\let\mathbb\varmathbb
\crefname{lemma}{Lemma}{Lemmas}
\crefname{fact}{Fact}{Facts}
\crefname{theorem}{Theorem}{Theorems}
\crefname{corollary}{Corollary}{Corollaries}
\crefname{claim}{Claim}{Claims}
\crefname{example}{Example}{Examples}
\crefname{algorithm}{Algorithm}{Algorithms}
\crefname{problem}{Problem}{Problems}
\crefname{definition}{Definition}{Definitions}
\crefname{exercise}{Exercise}{Exercises}
\newtheorem{theorem}{Theorem}[section]
\newtheorem*{theorem*}{Theorem}
\newtheorem{lemma}[theorem]{Lemma}
\newtheorem*{lemma*}{Lemma}
\newtheorem{fact}[theorem]{Fact}
\newtheorem*{fact*}{Fact}
\newtheorem*{proposition*}{Proposition}
\newtheorem{corollary}[theorem]{Corollary}
\newtheorem*{corollary*}{Corollary}
\newtheorem*{hypothesis*}{Hypothesis}
\newtheorem*{conjecture*}{Conjecture}
\theoremstyle{definition}
\newtheorem{definition}[theorem]{Definition}
\newtheorem*{definition*}{Definition}
\newtheorem*{construction*}{Construction}
\newtheorem*{example*}{Example}
\newtheorem*{question*}{Question}
\newtheorem{algorithm}[theorem]{Algorithm}
\newtheorem*{algorithm*}{Algorithm}
\newtheorem*{assumption*}{Assumption}
\newtheorem{problem}[theorem]{Problem}
\newtheorem*{problem*}{Problem}
\newtheorem*{openquestion*}{Open Question}
\theoremstyle{remark}
\newtheorem*{claim*}{Claim}
\newtheorem{remark}[theorem]{Remark}
\newtheorem*{remark*}{Remark}
\newtheorem*{observation*}{Observation}
\let\originalleft\left
\let\originalright\right
\renewcommand{\left}{\mathopen{}\mathclose\bgroup\originalleft}
\renewcommand{\right}{\aftergroup\egroup\originalright}
\let\latexparagraph\paragraph
\RenewDocumentCommand{\paragraph}{som}{%
  \IfBooleanTF{#1}
    {\latexparagraph*{#3}}
    {\IfNoValueTF{#2}
       {\latexparagraph{\maybe@addperiod{#3}}}
       {\latexparagraph[#2]{\maybe@addperiod{#3}}}%
  }%
}
\newcommand{\maybe@addperiod}[1]{%
  #1\@addpunct{.}%
}
\newcommand{\Authornote}[2]{}
\newcommand{\Authornotecolored}[3]{}
\newcommand{\Authorcomment}[2]{}
\newcommand{\Authorfnote}[2]{}
\newcommand{\Tnote}{\Authornote{T}}
\newcommand{\paren}[1]{(#1)}
\newcommand{\Paren}[1]{\left(#1\right)}
\newcommand{\brac}[1]{[#1]}
\newcommand{\Brac}[1]{\left[#1\right]}
\newcommand{\Bigbrac}[1]{\Big[#1\Big]}
\newcommand{\abs}[1]{\lvert#1\rvert}
\newcommand{\Abs}[1]{\left\lvert#1\right\rvert}
\newcommand{\card}[1]{\lvert#1\rvert}
\newcommand{\Card}[1]{\left\lvert#1\right\rvert}
\newcommand{\set}[1]{\{#1\}}
\newcommand{\Set}[1]{\left\{#1\right\}}
\newcommand{\norm}[1]{\lVert#1\rVert}
\newcommand{\Norm}[1]{\left\lVert#1\right\rVert}
\newcommand{\Snorm}[1]{\Norm{#1}^2}
\newcommand{\Normo}[1]{\Norm{#1}_1}
\newcommand{\Normi}[1]{\Norm{#1}_\infty}
\newcommand{\iprod}[1]{\langle#1\rangle}
\newcommand{\Iprod}[1]{\left\langle#1\right\rangle}
\newcommand{\Esymb}{\mathbb{E}}
\DeclareMathOperator*{\E}{\Esymb}
\newcommand{\given}{\mathrel{}\middle\vert\mathrel{}}
\newcommand{\suchthat}{\;\middle\vert\;}
\newcommand{\sge}{\succeq}
\renewcommand{\ij}{{ij}}
\newcommand{\sbits}{\{\pm1\}}
\newcommand{\super}[2]{#1^{\paren{#2}}}
\newcommand{\seteq}{\mathrel{\mathop:}=}
\newcommand\bdot\bullet
\DeclareMathOperator{\Tr}{Tr}
\DeclareMathOperator{\opt}{opt}
\newcommand{\Erdos}{Erd\H{o}s\xspace}
\newcommand{\Renyi}{R\'enyi\xspace}
\newcommand{\Hoelder}{H\"{o}lder\xspace}
\newcommand{\Holder}{\Hoelder}
\newcommand{\N}{\mathbb N}
\newcommand{\R}{\mathbb R}
\newcommand{\cA}{\mathcal A}
\newcommand{\cB}{\mathcal B}
\newcommand{\cD}{\mathcal D}
\newcommand{\cF}{\mathcal F}
\newcommand{\cG}{\mathcal G}
\newcommand{\cL}{\mathcal L}
\newcommand{\cM}{\mathcal M}
\newcommand{\cS}{\mathcal S}
\newcommand{\cT}{\mathcal T}
\newcommand{\cW}{\mathcal W}
\newcommand{\bbP}{\mathbb P}
\renewcommand{\leq}{\leqslant}
\renewcommand{\le}{\leqslant}
\renewcommand{\geq}{\geqslant}
\renewcommand{\ge}{\geqslant}
\let\epsilon=\varepsilon
\numberwithin{equation}{section}
\newcommand\MYcurrentlabel{xxx}
\newcommand{\MYstore}[2]{%
  \global\expandafter \def \csname MYMEMORY #1 \endcsname{#2}%
}
\newcommand{\MYload}[1]{%
  \csname MYMEMORY #1 \endcsname%
}
\newcommand{\MYnewlabel}[1]{%
  \renewcommand\MYcurrentlabel{#1}%
  \MYoldlabel{#1}%
}
\newcommand{\MYdummylabel}[1]{}
\newcommand{\torestate}[1]{%
  % overwrite label command
  \let\MYoldlabel\label%
  \let\label\MYnewlabel%
  #1%
  \MYstore{\MYcurrentlabel}{#1}%
  % restore old label command
  \let\label\MYoldlabel%
}
\newcommand{\restatetheorem}[1]{%
  % overwrite label command with dummy
  \let\MYoldlabel\label
  \let\label\MYdummylabel
  \begin{theorem*}[Restatement of \cref{#1}]
    \MYload{#1}
  \end{theorem*}
  \let\label\MYoldlabel
}
\newcommand{\restatelemma}[1]{%
  % overwrite label command with dummy
  \let\MYoldlabel\label
  \let\label\MYdummylabel
  \begin{lemma*}[Restatement of \cref{#1}]
    \MYload{#1}
  \end{lemma*}
  \let\label\MYoldlabel
}
\newcommand{\restateprop}[1]{%
  % overwrite label command with dummy
  \let\MYoldlabel\label
  \let\label\MYdummylabel
  \begin{proposition*}[Restatement of \cref{#1}]
    \MYload{#1}
  \end{proposition*}
  \let\label\MYoldlabel
}
\newcommand{\restatefact}[1]{%
  % overwrite label command with dummy
  \let\MYoldlabel\label
  \let\label\MYdummylabel
  \begin{fact*}[Restatement of \cref{#1}]
    \MYload{#1}
  \end{fact*}
  \let\label\MYoldlabel
}
\newcommand{\restate}[1]{%
  % overwrite label command with dummy
  \let\MYoldlabel\label
  \let\label\MYdummylabel
  \MYload{#1}
  \let\label\MYoldlabel
}
\newcommand{\e}{\epsilon}
\newcommand{\eps}{\epsilon}
\newcommand*{\Id}{\mathrm{Id}}
\newcommand*{\Normf}[1]{\Norm{#1}_{\mathrm{F}}}
\newenvironment{algorithmbox}{\begin{mdframed}[nobreak=true]
\begin{algorithm}}{\end{algorithm}\end{mdframed}}
\newcommand{\bsaw}[2]{\text{BSAW}_{#1,#2}}
\newcommand{\nbsaw}[2]{\text{NBSAW}_{#1,#2}}
\newcommand{\normn}[1]{\norm{#1}_\textnormal{nuc}}
\newcommand{\Normn}[1]{\Norm{#1}_\textnormal{nuc}}
\newcommand{\ind}[1]{\mathbf{1}_{\Brac{#1}}}
\providecommand{\Yij}{\mathbf{Y}_{ij}}
\newcommand{\saw}[2]{\text{SAW}^{#2}_{#1}}
\newcommand{\sbm}{\SBM_n(d,\e)}
\newcommand{\SBM}{\mathsf{SBM}}
\newcommand{\Q}{Q^{(s)}}
\renewcommand{\Normo}[1]{\Norm{#1}_{\text{sum}}}
\renewcommand{\Normi}[1]{\Norm{#1}_{\max}}
\newcommand*{\transpose}[1]{{#1}{}^{\mkern-1.5mu\mathsf{T}}}
\newcommand*{\dyad}[1]{#1#1{}^{\mkern-1.5mu\mathsf{T}}}
\title{
  Robust recovery for stochastic block models\thanks{This project has received funding from the European Research Council (ERC) under the European Union’s Horizon 2020 research and innovation programme (grant agreement No 815464)
}
}
\author{
  Jingqiu Ding\thanks{ETH Z\"urich.
}
  \and
  Tommaso d'Orsi\thanks{ETH Z\"urich.
}
  \and
  Rajai Nasser\thanks{ETH Z\"urich.
}
  \and
  David Steurer\thanks{ETH Z\"urich.
}
}
\begin{document}

\pagestyle{empty}

% MAKE TITLE

\maketitle
\thispagestyle{empty} % seems to be required here to avoid page number on first page

% ABSTRACT

\begin{abstract}

  % -*- latex -*-
We develop an efficient algorithm for weak recovery in a robust version of the stochastic block model.
The algorithm matches the statistical guarantees of the best known algorithms for the vanilla version of the stochastic block model.
In this sense, our results show that there is no price of robustness in the stochastic block model.

Our work is heavily inspired by recent work of Banks, Mohanty, and Raghavendra (SODA 2021) that provided an efficient algorithm for the corresponding distinguishing problem.

Our algorithm and its analysis significantly depart from previous ones for robust recovery.
A key challenge is the peculiar optimization landscape underlying our algorithm:
The planted partition may be far from optimal in the sense that completely unrelated solutions could achieve the same objective value.
This phenomenon is related to the push-out effect at the BBP phase transition for PCA.
To the best of our knowledge, our algorithm is the first to achieve robust recovery in the presence of such a push-out effect in a non-asymptotic setting.

Our algorithm is an instantiation of a framework based on convex optimization (related to but distinct from sum-of-squares), which may be useful for other robust matrix estimation problems.

A by-product of our analysis is a general technique that boosts the probability of success (over the randomness of the input) of an arbitrary robust weak-recovery algorithm from constant (or slowly vanishing) probability to exponentially high probability.

\end{abstract}

\clearpage

% TOC

% assumes microtype
\microtypesetup{protrusion=false}
\tableofcontents{}
\microtypesetup{protrusion=true}

\clearpage

\pagestyle{plain}
\setcounter{page}{1}

% SECTION

\section{Introduction}\label{sec:introduction}

The stochastic block model is an extensively studied statistical model for community detection in graphs.
(For a broad overview, see the excellent survey \cite{DBLP:journals/corr/Abbe17}.)
In its most basic form, the stochastic block model describes the following joint distribution \((\mathbf{x},\mathbf{G})\sim \SBM_n(d,\e)\) between a vector \(x\) of \(n\) binary\footnote{
  More general versions of the stochastic block model allow for more than two labels, non-uniform probabilities for the labels, and general edge probabilities depending on the label assignment.
  However, many of the algorithmic phenomena of the general version can in their essence already be observed for the basic version that we consider in this work.
} labels and an \(n\)-vertex graph \(\mathbf{G}\):
\begin{itemize}
\item draw a vector~\(\mathbf{x}\in \sbits^n\) uniformly at random,
\item for every pair of distinct vertices~\(i,j\in [n]\), independently create an edge~\(\{i,j\}\) in the graph~\(\mathbf{G}\) with probability~\((1+\frac{\e}{2}\cdot \mathbf{x}_i\cdot \mathbf{x}_j)\cdot \frac d n\).
\end{itemize}
Note that for distinct vertices \(i,j\in[n]\), the edge \(\{i,j\}\) is present in \(\mathbf{G}\) with probability \((1+\frac{\eps}{2})\cdot \frac d n \) if the vertices have the same label \(\mathbf{x}_i=\mathbf{x}_j\) and with probability \((1-\frac{\eps}{2})\cdot \frac d n\) if the vertices have different labels \(\mathbf{x}_i\neq \mathbf{x}_j\).

Given a graph \(\mathbf{G}\) sampled according to this model, the goal is to recover the (unknown) underlying vector of labels as well as possible.

\paragraph{Weak recovery}

We say that an algorithm achieves \emph{(weak) recovery} for the stochastic block model \(\{\SBM_n(d,\e)\}_{n\in \N}\) if the correlation of the algorithm's output \(\hat x(\mathbf{G})\in \sbits^n\) and the underlying vector \(\mathbf{x}\) of labels is bounded away from zero as \(n\) grows,\footnote{We remark that other definitions of weak recovery require that the algorithm achieves constant correlation with probability tending to \(1\) as \(n\) grows.
It turns out that in the robust setting considered in this paper it is always possible to boost the success probability from \(\Omega(1)\) to \(1-o(1)\). See \cref{sec:techniques}.
}
\begin{equation}
  \label{eq:weak-recovery}
  \E_{(\mathbf{x},\mathbf{G})\sim \SBM_n(d,\e)}\Bigbrac{\tfrac 1n\abs{\iprod{x,\hat x(\mathbf{G})}}} \ge \Omega_{\e,d}(1)\,.
\end{equation}
(Here, \(\Omega_{\e,d}(1)\) hides a positive number depending on \(\e\) and \(d\) but independent of \(n\)).
A series of seminal works \cite{Mossel_2014,DBLP:journals/corr/MosselNS13a,DBLP:conf/stoc/Massoulie14} showed that weak recovery is possible (also computationally efficiently) if and only if \(d>4/\e^2\) confirming a conjecture \cite{Decelle_2011} from statistical physics (this threshold is commonly referred to as the Kesten-Stigum threshold).\footnote{For $k$ communities the threshold is $d>k^2/\e^2$. }

\paragraph{Robustness}

A fascinating issue that arises in the context of the works on weak recovery is \emph{robustness}:
The algorithms used to show that weak recovery is possible when \(d>4/\e^2\) are fragile in the sense that adversarially modifying a vanishing fraction of edges could fool the algorithm into outputting labels completely unrelated to the true labels.
The reason is that these algorithms are based on particular kinds of random walks (self-avoiding or non-backtracking) that can be affected disproportionally by adding small cliques or other dense subgraphs.

Indeed, other kinds of algorithms (based on semidefinite programming) have stronger guarantees in robust settings \cite{Feige_2001, guedon14:_commun_groth}.
However, these kinds of algorithms are only known to work for \(d>C\cdot 4/\e^2\) for an absolute constant \(C>1\) (even in the non-robust setting).\footnote{
For high-degree graphs, the best-known bound for the basic semidefinite programming relaxation to achieve weak-recovery is of the form \(\frac{\e^2}{4} d > 1+o_d(1)\) for an unspecified function \(o_d(1)\) tending to 0 as \(d\) grows \cite{montanari15:_semid_progr_spars_random_graph}.}

Taken together these results raise the question whether, at the threshold \(d>4/\e^2\), it is inherently impossible to achieve weak recovery robustly.
Indeed, certain strong adversarial models (``monotone adversaries''\footnote{A \emph{monotone adversary} is allowed to change an arbitrary number of edges as long as each change increases the likelihood of the planted labeling}) were shown to change the threshold and require a bound of the form \(d > C\cdot 4/\e^2\) for \(C>1\) to ensure that weak recovery remains possible \cite{DBLP:conf/stoc/MoitraPW16}.
However, for a natural class of adversaries that are allowed to alter any vanishing fraction of edges, it remained open whether the threshold for weak recovery changes, or whether weak recovery robust against this class of adversaries is possible.
In the asymptotic setting \(d\to \infty\), this kind of robustness was achieved using the basic semidefinite programming relaxation for the likelihood maximization problem \cite{montanari15:_semid_progr_spars_random_graph}.
However, non-rigorous statistical-physics calculations \cite{Javanmard_2016} suggest that the same algorithm cannot achieve the threshold for constant degree parameter \(d\).

Groundbreaking recent work \cite{banks19:_local_statis_semid_progr_commun_detec} developed a polynomial-time algorithm for the corresponding distinguishing problem:
Given a graph drawn from \(\SBM_n(d,\e)\) with \(d>4/\e^2\) and an  \Erdos-\Renyi  random graph with the same expected number of edges, the algorithm can distinguish between the two graphs and robustly so, i.e., even after altering a small constant fraction of edges.
Similarly to previous algorithms achieving the threshold \(d>4/\e^2\), the algorithm takes into account certain kinds of random walks (specifically non-backtracking ones).
A crucial difference is that the algorithm considers only walks of constant length (for fixed \(d\) and \(\e\)) as opposed to walks of logarithmic length like previous algorithms.
In order to leverage the more limited information provided by shorter walks, the algorithm in \cite{banks19:_local_statis_semid_progr_commun_detec} needs to employ heavier convex optimization techniques (specifically sum-of-squares).
The upside is that robustness follows almost directly: The altered edges can affect only a small constant fraction of the (constant-length) walks considered by the algorithm.
Correspondingly, the effect on the optimal value for the optimization is small.

\subsection{Result}

We say that an algorithm that given a graph \(\mathbf{G}\) outputs an estimate \(\hat x(\mathbf{G})\) for the community labels of \(\mathbf{G}\) achieves \(\rho\)-robust weak recovery for \(\set{\SBM_{n}(d,\e)}_{n\in\N}\) if
\begin{equation}
  \label{eq:robust-weak-recovery}
  \E_{(\mathbf{x},\mathbf{G})\sim \SBM_n(d,\e)}\min_{G^\circ\in N_\rho(\mathbf{G})} \Bigbrac{\tfrac 1n \abs{\iprod{\mathbf{x},\hat x(G^\circ)}}} \ge \Omega_{d,\e}(1)\ ,
\end{equation}
where \(N_{\rho}(\mathbf{G})\) is the set of graphs \(G^\circ\) that can be obtained from \(\mathbf{G}\) by changing at most a \(\rho\)-fraction of its edges\footnote{That is, each $G^\circ$ can be obtained from $\mathbf{G}$ through a sequence of $\rho\cdot \Card{E(\mathbf{G})}$ edits, each consisting of an addition or deletion.} (so that \(\card{E(\mathbf{G})\mathop{\triangle} E(G^\circ)}\le \rho\cdot\paren{ \card{E(\mathbf{G})} + \card{E(G^\circ)}} \)).

The following theorem shows that this notion of robustness does not alter the statistical threshold and that robust polynomial-time algorithms exist that work all the way up to this threshold.

\begin{theorem}\label{thm:main}
	For every \(\e,d\) with \(d>4/\e^2\), there exists \(\rho>0\) such that \(\rho\)-robust weak recovery for \(\set{\SBM_{n}(d,\e)}_{n\in\N}\) is possible.
  	Moreover, the underlying algorithm runs in polynomial time.
\end{theorem}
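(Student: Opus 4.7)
The plan is to lift the non-backtracking-walk convex-programming approach of Banks--Mohanty--Raghavendra from distinguishing to recovery, while engineering the convex program so that its optimum cannot be ``pushed out'' either by small adversarially planted dense subgraphs or by the BBP-type rank-one outlier that the planted signal itself induces in a naive spectral relaxation.

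First, fix a constant walk length $s = s(d,\e)$ and form a symmetric matrix $M = M(G) \in \R^{n \times n}$ whose $(i,j)$-entry is the appropriately centered count of length-$s$ non-backtracking walks from $i$ to $j$ in $G$. Non-backtracking analysis in the spirit of \cite{banks19:_local_statis_semid_progr_commun_detec} shows that for $d > 4/\e^2$ this matrix carries a rank-one signal of order $n$ along the direction $\dyad{\mathbf{x}}$, while its centered residual is spectrally delocalized; on the \Erdos-\Renyi null model, no such signal is present. Crucially, since $s$ is a constant depending only on $(d,\e)$, any $\rho$-fraction edge edit affects only an $O(\rho)$-fraction of the walks defining $M$, so the corrupted matrix $M(G^\circ)$ differs from $M(\mathbf{G})$ only by a controlled amount in the matrix norms that the convex program will actually see.

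The estimator then solves a convex program of the form
\begin{equation*}
\max_{X} \; \iprod{M(G^\circ), X} \quad \text{subject to} \quad X \sge 0,\; X_{ii} = 1,\; \text{and additional ``spread'' constraints on } X,
\end{equation*}
where the spread constraints are designed so that any feasible $X$ achieving objective close to the planted value must distribute its Frobenius mass over $\Omega(n)$ coordinates. These constraints are the key new ingredient: they rule out both the adversarially planted dense subgraph (a localized outlier) and the BBP-type push-out solution that causes a plain top-eigenvector method to fail at the threshold. The label vector $\hat x$ is then extracted from a leading eigenvector of the optimal $X$, or by Grothendieck-style random hyperplane rounding.

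The heart of the argument, and the step I expect to be the main obstacle, is a combined completeness/soundness analysis of this convex program that is uniform over $G^\circ \in N_\rho(\mathbf{G})$. Completeness is comparatively easy: the planted $X = \dyad{\mathbf{x}}$ is feasible with high probability and scores $\Omega(n^2)$ on $M(\mathbf{G})$, losing only $O(\rho\,n^2)$ after the adversary's edits. Soundness demands that every feasible $X$ whose objective is close to this value must satisfy $\iprod{\dyad{\mathbf{x}}, X} \ge \Omega(n^2)$. I would prove this via a signal-vs-noise decomposition of $M(G^\circ)$: the operator norm of the centered noise component is bounded using the short-walk spectral estimates of \cite{banks19:_local_statis_semid_progr_commun_detec} uniformly over small edit neighborhoods, while the localized contributions that an adversary could produce are annihilated by the spread constraints. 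Rounding then yields $\tfrac{1}{n}\abs{\iprod{\mathbf{x}, \hat x(G^\circ)}} \ge \Omega_{d,\e}(1)$ with constant probability. Finally, to upgrade this to the expectation guarantee of \cref{eq:robust-weak-recovery} (and the exponentially-high-probability version advertised in the abstract), one applies the paper's general boosting scheme: run the base estimator independently on many random subsamples of $\mathbf{G}$ and aggregate by per-vertex plurality, so that independence drives the per-vertex error probability to $e^{-\Omega(n)}$ while $\rho$-robustness is inherited from the base estimator.
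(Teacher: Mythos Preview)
Your proposal has a genuine gap at the soundness step, and it is precisely the obstacle the paper identifies and then circumvents by a different mechanism.

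You propose to prove soundness by a signal-vs-noise decomposition of $M(G^\circ)$ and an \emph{operator norm} bound on the centered residual. The paper explains (see the discussion around \cref{eq:q-sdp-non-flat}) that this fails near the Kesten--Stigum threshold: for constant $s$, one has $\iprod{\super{\bar Q}s(\mathbf{Y}),\tfrac{1}{n}\dyad{\mathbf{x}}}<\norm{\super{\bar Q}s(\mathbf{Y})-\dyad{\mathbf{x}}}$, so the planted solution is \emph{not} near-optimal for the basic SDP, and there exist feasible, fully delocalized $X$ with objective strictly above the planted value yet uncorrelated with $\dyad{\mathbf{x}}$. Your ``spread constraints'' cannot rescue this, because the failure is not a localization phenomenon: the noise is already strong in the spread directions. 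The paper's remedy is structurally different from adding constraints to the basic SDP: it passes to the \emph{Schatten norm} regime, proving the push-out inequality $\norm{\super{\bar Q}s(\mathbf{Y})-\dyad{\mathbf{x}}}_t\le(1-\delta')\norm{\super{\bar Q}s(\mathbf{Y})}_t$ for $t=\Theta(\log n)$ via delicate trace-moment computations over block self-avoiding walks, and then optimizes $\iprod{\super{\bar Q}s(\mathbf{Y})^2,Z}$ subject to $Z\succeq 0$, $\Tr Z^{t/(t-1)}\le 1$, and $Z_{ii}\le O(1)/n$. The diagonal bound is admissible only because the unconstrained optimum $\super{\bar Q}s(\mathbf{Y})^{2t-2}$ is provably flat; establishing this flatness is itself part of the trace analysis and is not something your unspecified spread constraints supply.

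Your boosting step is also not the paper's. You suggest independent subsampling plus plurality vote; the paper instead uses the blowing-up lemma (a concentration-of-measure argument on the product space of the conditionally independent edge indicators) to show that if the success event has probability $\Omega(e^{-n^{1/4}})$ then a slightly weaker success event has probability $1-e^{-n^{1/4}}$, at the cost of an $o(1)$ loss in $\rho$ and in correlation. Subsampling is problematic in this robust setting: the adversary's $\rho n$ edits are chosen with knowledge of the full graph, so the subsamples are not independent of the corruption, and in a constant-degree graph a subsample may no longer sit above the KS threshold.
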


We present a formal statement in \cref{thm:sbm-recovery-technical-vector}.

\subsection{Organization}
\Tnote{To update if the organization changes}
The rest of the paper is organized as follows.
In \cref{sec:techniques} we introduce the main ideas and techniques developed to prove \cref{thm:main}.
Preliminary notions are discussed in \cref{sec:preliminaries}. In \cref{sec:scale-free-recovery-algorithm} we present our general framework for robust algorithms, we then apply it to the stochastic block model in \cref{sec:sbm-recovery}.
In \cref{sec:bounds-moments-Q} we prove the probabilistic results needed for the algorithm to succeed.
We present most of the technical probabilistic and combinatorial details  through \cref{sec:appendix-lower-bound-non-centered}, \cref{sec:appendix-upper-bound-centered}, \cref{sec:technical-lemmas-trace-bounds} and \cref{sec:tools-bsaws}.

\section{Techniques}\label{sec:techniques}

Before explaining our new ideas, we briefly discuss related prior approaches for weak-recovery in stochastic block models.

\paragraph{Basic semidefinite programming approach and robust recovery away from the Kesten-Stigum threshold}

The following approach based on semidefinite programming is known to have strong robustness properties when the degree parameter \(d\) exceeds the KS threshold by a large enough constant factor \cite{guedon14:_commun_groth}.\footnote{%
	As discussed earlier, in the asymptotic regime \(d\to \infty\) better guarantees for this approach are known \cite{montanari15:_semid_progr_spars_random_graph}.
	However, these results have no bearing on constant degree parameters.}

Let \((\mathbf{x}, \mathbf{G}) \sim \SBM_n(d,\e)\) with \(\e>0\)\footnote{
	We remark that it also makes sense to consider stochastic block models with negative bias parameter \(\e\) (sometimes called the anti-ferromagnetic case).
} 
and let \(\mathbf{Y}\) be its centered adjacency matrix, so that \(\mathbf{Y}_{ij}=1-d/n\) if \(ij\in E(\mathbf{G})\), \(\mathbf{Y}_{ii}=0\), and \(\mathbf{Y}_{ij}=-d/n\) otherwise.
This matrix satisfies \(\E \mathbf{Y} = 0\) and, up to a scaling factor, its conditional expectation agrees with \(\dyad{\mathbf{x}}\) on all off-diagonal entries,
\begin{equation}
	\label{eq:unbiased-adjacency-matrix}
	\bar{\mathbf{Y}} \seteq \E \brac{\mathbf{Y} \mid \mathbf{x}} = \tfrac{\e d}{2n} \cdot (\dyad {\mathbf{x}} - \Id_n)\,.
\end{equation}
This property gives us hope that the maximizer \(\hat {\mathbf{x}}\) of \(\iprod{\mathbf{Y}, \dyad{x}}\) over all \(x\in\set{\pm 1}^n\) is correlated with the underlying labeling \(\mathbf{x}\).\footnote{This optimization problem is closely related to the likelihood maximization problem for the stochastic block model.
}
Concretely, the optimal value of this optimization problem is at least the value achieved by the planted labeling \(\mathbf{x}\), which is \(o(n)\)-close to its expectation \(\iprod{\bar{\mathbf{Y}},\dyad{\mathbf{x}}}=\tfrac {\e d }{2} \cdot (n-1)\).
Hence, the maximizer \(\hat{\mathbf{x}}\) satisfies the inequality
\[
\tfrac{\e d }{2} \cdot n - o(n)
\le \iprod{\mathbf{Y}, \dyad{\hat{\mathbf{x}}}}
= \tfrac{\e d }{2}  \cdot  n \cdot \Bigl (\tfrac 1 {n^2}\iprod{\mathbf{x},\hat{\mathbf{x}}}^2 - \tfrac 1 n \Bigr)
+ \iprod{\mathbf{Y}-\bar{\mathbf{Y}}, \dyad{\hat{\mathbf{x}}}}
\,.
\]
This inequality allows us to conclude that \(\hat{\mathbf{x}}\) achieves the desired correlation \(\tfrac 1 {n^2}\iprod{\mathbf{x},\hat{\mathbf{x}}}^2\ge \Omega(1)\) as long as we have an upper bound on \(\iprod{\mathbf{Y}-\bar{\mathbf{Y}}, \dyad{\hat{\mathbf{x}}}}\) smaller than \(\tfrac {\e d }{2} \cdot n\) by a constant factor.
This approach is pursued by \cite{guedon14:_commun_groth}, who proved that with high probability, for some constant factor \(C> 1\),
\begin{equation}
	\max_{x\in\set{\pm 1}^n}\abs{\iprod{\mathbf{Y}-\bar{\mathbf{Y}}, \dyad{x}}}
	\le C \cdot \sqrt {d\,}\cdot n
	\,.
	\label{eq:cut-norm}
\end{equation}
It follows that this estimator achieves weak recovery if \(\tfrac{\e \sqrt d}{2 C}\) exceeds \(1\) by a constant.
(Since \(C>1\), this bound fails to approach the Kesten-Stigum threshold.\footnote{We remark that an exact analysis of the maximum value of \(\abs{\iprod{\mathbf{Y}-\bar{\mathbf{Y}}, \dyad{x}}}\le C \sqrt {d\,}\cdot n\) over all \(x\in\set{\pm 1}^n\) is very challenging.
	This maximum value is related to the coefficient of the second-order term of the maximum cut in an \Erdos--\Renyi graph with degree parameter \(d\).
	This coefficient has been analyzed only in the asymptotic regime \(d\to \infty\) 
	\cite{DBLP:journals/corr/DemboMS15}.
	Even in this simplified setting, the coefficient corresponds to \(C>1\).
})

So far, the discussed approach is not computationally efficient (the underlying optimization problem is NP-hard).
To remedy this issue, we consider the following semidefinite programming relaxation of the problem,
\begin{equation}
	\label{eq:basic-sdp}  
	\text{maximize }\iprod{\mathbf{Y}, X}
	\text{ subject to } X\succeq 0\,,~\forall i\,.~X_{ii}=1\,.
\end{equation}
We refer to this relaxation as \emph{basic SDP}.\footnote{We remark that the famous Goemans--Williamson approximation algorithm for the max-cut problem \cite{DBLP:journals/jacm/GoemansW95} uses essentially the same basic SDP relaxation.}
To show that its optimal solution \(\hat {\mathbf{X}}\) achieves constant correlation \(\tfrac 1{n^2} \iprod{\hat{\mathbf{X}},\dyad{\mathbf{x}}}\ge \Omega(1)\), we can imitate the previous analysis (as done in \cite{guedon14:_commun_groth}).
The main difference is that we need to upper bound the deviation term \(\abs{\iprod{\mathbf{Y}-\bar{\mathbf{Y}},X}}\) uniformly over all feasible solutions \(X\) to the basic SDP (instead of just over all cut matrices \(\dyad x\) for \(x\in\set{\pm 1}^n\)).
Here, Grothendieck's inequality \Tnote{Cite Grothendieck} \cite{DBLP:journals/corr/abs-1108-2464, DBLP:conf/stoc/AlonN04} turns out to imply a bound that is at most a constant factor worse than the bound we had before (for cut matrices).

A key benefit of this kind of analysis (observed in early works on semirandom graph problems \cite{Feige_2001}; see also \cite{DBLP:conf/stoc/MoitraPW16}) is that it directly implies strong robustness guarantees.
The reason is that we used only one property of the estimator \(\hat{\mathbf{X}}\) (or \(\hat {\mathbf{x}}\)):
it is a feasible solution to our optimization problem with objective value at least as high as the planted solution \(\dyad{\mathbf{x}}\) (up to potentially a small fudge factor).
In other words, if \(\e \sqrt d/2 \) is a large enough constant for the above analysis of the basic SDP to succeed, then there exists some \(\rho>0\) (independent of \(n\)) such that with high probability (over \(\mathbf{Y}\)), every solution \(X\) to the basic SDP with objective value
\begin{equation}
	\label{eq:objective-value-at-least-planted}
	\iprod{\mathbf{Y},X}\ge \iprod{\mathbf{Y},\dyad{\mathbf{x}}} - \rho\cdot n
\end{equation}
achieves constant correlation \(\iprod{X, \dyad{\mathbf{x}}}\ge \Omega(1)\cdot n^2\).

\emph{Why does this property imply robustness?}
Suppose that \(Y'\) is the centered adjacency matrix of some corrupted versions \(G'\) of \(\mathbf{G}\) (according to some adversarial model).
Let \(\hat X'\) be the optimal solution to the basic SDP with \(\mathbf{Y}\) replaced by \(Y'\) in the objective function.
Since the planted solution \(\dyad{\mathbf{x}}\) cannot have a higher objective value than \(\hat X'\), it holds \(\iprod{Y',\hat X'} \ge \iprod{Y',\dyad{\mathbf{x}}}\).
Hence, it suffices to verify that for the adversarial model of interest, the  inequality \(\iprod{Y',X} \ge \iprod{Y',\dyad{\mathbf{x}}}\) for a feasible solution \(X\) to the basic SDP implies the previous inequality \cref{eq:objective-value-at-least-planted}.
For monotone adversaries, this implication holds (even without the fudge term \(\rho \cdot n\)) because for every monotone edge alteration,\footnote{
	A monotone edge alteration consists of either adding an edge between vertices with the same planted label or deleting an edge between vertices with different planted labels.
} the objective function increases for the planted solution by at least as much as for \(X\).
Even in the non-monotone case, every edge alteration (insertion or deletion) can change the objective function by at most \(2\) for the planted solution or for \(X\).
Hence, if we allow up to \(\rho \cdot n / 4\) edge alterations in our adversarial model, the desired implication holds.

\paragraph{Fragile recovery up to the Kesten--Stigum threshold using convex optimization}

In the non-robust setting, several algorithms are known to achieve weak-recovery all the way up to the Kesten-Stigum threshold \cite{DBLP:journals/corr/MosselNS13a,DBLP:conf/stoc/Massoulie14, DBLP:conf/focs/BordenaveLM15, DBLP:conf/nips/AbbeS16, DBLP:conf/focs/HopkinsS17}. 
The analyses of all these algorithms involve statistics of certain kinds of walks in graphs (e.g., self-avoiding, non-backtracking, or shortest walks).
Among these algorithms, our work is most closely related to the algorithm in \cite{DBLP:conf/focs/HopkinsS17}, which combines walk-statistics and convex-optimization techniques similar to those discussed earlier in the context of robustness.
For a parameter \(s\in\N\), this algorithm considers a random \(n\)-by-\(n\) matrix \(\super{\mathbf{Q}}{s}\) such that each off-diagonal entry is obtained by evaluating a (deterministic) multivariate degree-\(s\) polynomial at the centered adjacency matrix \(\mathbf{Y}\) of \(\mathbf{G}\).
Concretely, up to a scaling factor depending only on \(n,d,\e,s\),
\[
\super{\mathbf{Q}}{s}_{ij}
\propto \sum_{W\in \saw{ij}{s}} \mathbf{Y}_W\,
\text{ where }\mathbf{Y}_W \seteq \prod_{uv\in W} \mathbf{Y} _{uv}
\,.
\]
Here, \(\saw{ij}{s}\) consists of all length-$s$ self-avoiding walks between \(i\) and \(j\) in the complete graph on \(n\) vertices.
Since the entries of \(\mathbf{Y}\) are independent when conditioned on \(\mathbf{x}\), by our earlier observation \cref{eq:unbiased-adjacency-matrix} we have for all \(W\in \saw{ij}{s}\) with \(i\neq j\),
\[
\E \brac{\mathbf{Y}_ W \mid \mathbf{x}}=\prod_{uv\in W} \E\brac{\mathbf{Y}_{uv} \mid \mathbf{x}}=\paren{\tfrac {\e d}{2n}}^s \cdot \mathbf{x}_i \mathbf{x}_j
\,.
\]
Hence, if we choose the diagonal entries of \(\super{\mathbf{Q}} s\) to be \(0\) (like for \(\mathbf{Y}\)) and the aforementioned scaling factor to be  \(\paren{\tfrac{2n}{\e d}}^s/\card{\saw{ij}s}\), then \(\super{\mathbf{Q}} s\) is an unbiased estimator similar to \(\mathbf{Y}\),
\[
\E\brac{\super{\mathbf{Q}}s\mid \mathbf{x}} = \dyad{\mathbf{x}}-\Id_n
\,.
\]
Note that \(\super{\mathbf{Q}}1\) is up to a scaling factor equal to \(\mathbf{Y}\).

\emph{What can we gain from \(\super{\mathbf{Q}}s\) for \(s>1\)?}
Here, it is instructive to compare the variance of entries of the matrices (conditioned on \(\mathbf{x}\)).
In \(\super{\mathbf{Q}}1\) each entry has conditional variance about \(\paren{\tfrac{2n}{\e d}}^2\cdot \tfrac d n=\tfrac {4} {d\e^2} \cdot n\), substantially larger than the magnitude of its conditional expectation, which is \(1\).
Now suppose we are barely above the Kesten--Stigum threshold so that \(\tfrac{\e^2d}{4}=1+\delta\) for some \(\delta>0\) independent of \(n\).
Then, as we increase \(s\), the conditional variance decreases\footnote{The proof that the variance decreases argues that the unbiased estimators summed in an entry of \(\super{\mathbf{Q}}s\) behave similarly to pairwise independent estimators \cite{DBLP:journals/corr/MosselNS13a,DBLP:conf/focs/HopkinsS17}.} (roughly like \(\paren{\tfrac{4 \e^2}{d}}^{s}\cdot n=n/(1+\delta)^s\)) while the conditional expectation stays the same.
Indeed, it turns out that we can choose \(s\le O_{\delta}(\log n)\) such that the conditional variance of an entry of \(\super{\mathbf{Q}}s\) is bounded by a constant \(O_\delta(1)\) and thus has the same order as its conditional expectation.

This property implies the following inequality for a small enough \(\delta'>\Omega_\delta(1)\),
\begin{equation}
	\label{eq:frobenius-correlation}
	\E \norm{\super{\mathbf{Q}} s - \dyad{\mathbf{x}}}_F^2
	\le (1-\delta')\cdot \norm{\super{\mathbf{Q}} s}_F^2
	\,.
\end{equation}
Based on this notion of correlation, the algorithm\footnote{
	The description of the algorithm in \cite{DBLP:conf/focs/HopkinsS17} is slightly different from our description in that part of the objective function we stated appears as a constraint in \cite{DBLP:conf/focs/HopkinsS17}.
	For an appropriate choice of \(\lambda\) both versions are equivalent.
} in \cite{DBLP:conf/focs/HopkinsS17} considers the following (tractable) convex optimization problem for a regularization parameter \(\lambda>0\) (together with a simple rounding algorithm),
\begin{equation}
	\label{eq:hs17-relaxation}
	\text{maximize }
	\iprod{\super{\mathbf{Q}} s, X} - \lambda\cdot \norm{X}_F^2
	\text{ subject to }
	X\succeq 0\,,~\forall i\,.~X_{ii}=1\,.
\end{equation}
We remark that this algorithm is an instance of an emerging meta-algorithm for estimation problems that is based on sum-of-squares and specified in terms of low-degree polynomials in two kinds of variables, ``instance variables'' for the input and ``solution variables'' for the desired output \cite{DBLP:conf/focs/HopkinsKPRSS17,DBLP:conf/focs/HopkinsS17,RSS18}.

\emph{Could this algorithm be robust?}
A key property of the basic SDP \cref{eq:basic-sdp} for its robustness analysis is that every edge alteration can change the objective value (for a particular feasible solution) by at most \(O_{\e,d}(1/n)\) times the original objective value of the planted solution.
In contrast, a single edge alteration can change the objective value\footnote{
	For the purposes of this argument, we can think of \(\super {\mathbf{Q}} s_{ij}\) as \(n\cdot (2/\e)^s\) times the average of \(\mathbf{Y}_W\) over all self-avoiding walks \(W\) between \(i\) and \(j\) in \(\mathbf{G}\) (as opposed to walks in the complete graph).
} of a feasible solution in \cref{eq:hs17-relaxation} by as much as about \(n\cdot (2/\e)^s\).
Since the original objective value of the planted solution is close to its expectation \(\E \iprod{\super{\mathbf{Q}}{s},\dyad{\mathbf{x}}} = n\cdot(n-1)\), we can afford this sensitivity of the objective function only for constant \(s\le O_{\e,d}(1)\).
Unfortunately, the correlation \cref{eq:frobenius-correlation} required for the analysis of \cref{eq:hs17-relaxation} can only be achieved for \(s\) logarithmic in \(n\) (this is related to the well-known fact that constant-degree graphs have at least logarithmic mixing time).

\paragraph{Robust distinguishing up to the Kesten--Stigum threshold}

Our discussion so far suggests a natural starting point for designing a robust algorithm that works up to the Kesten--Stigum threshold: matrices defined in terms of constant-length walks, e.g., the matrix \(\super{\mathbf{Q}}s\) for constant \(s\le O_{\e,d}(1)\).

Indeed,  recent work \cite{banks19:_local_statis_semid_progr_commun_detec} takes this approach in order to robustly solve a related distinguishing problem.
The goal is to distinguish between an \Erdos--\Renyi random graph \(\mathbf{G}_0\sim \mathcal G(n,\tfrac d n)\) and the stochastic block model \(\mathbf{G}_1\sim \SBM_{n}(d,\e)\) up to the Kesten--Stigum threshold \(d>4/\e^2\).
Recall that \(\super{\mathbf{Q}}s\) corresponds to the polynomial \(\super Q s(Y)\seteq \tfrac{(2n)^s}{(\e d)^s\card{\saw{ij}{s}}}\sum_{W\in\saw{ij}s} Y_W\).
Let \(\mathbf{Y}_0,\mathbf{Y}_1\) be the respective centered adjacency matrices of \(\mathbf{G}_0,\mathbf{G}_1\).
In order to distinguish \(\mathbf{G}_0\) and \(\mathbf{G}_1\), we seek an (efficiently computable) matrix norm that with high probability is much smaller for \(\super Q s(\mathbf{Y}_0)\) than for \(\super Q s(\mathbf{Y}_1)\).
Both matrix norms discussed so far (the Frobenius norm in \cref{eq:frobenius-correlation} and the cut norm in \cref{eq:cut-norm}) appear to be poor choices:
The Frobenius norm of \(\super Q s(\cdot)\) cannot distinguish between \(\mathbf{G}_0\) and \(\mathbf{G}_1\) for constant \(s\).
A tight analysis of the cut-norm of \(\super Q s(\cdot)\) appears to be challenging due to the high amount of dependencies between the entries of the matrix.
A natural alternative (actually related to the cut norm) is the spectral norm combined with some truncation step.\footnote{
	The truncation step refers to removing vertices with unusually large degree.
	This truncation step is necessary because with high probability (non-regular) constant-degree random graphs have a small number of vertices with logarithmic degree that skew the spectral norm of the centered adjacency matrix in an undesirable way.
	This issue persists also for the matrices \(\super Q s(\cdot)\) when \(s\) is constant.
}
Concretely, the analysis in \cite{banks19:_local_statis_semid_progr_commun_detec} boils down\footnote{
	We remark that the presentation of the algorithm and the choice of the function \(\super{\bar Q} s(\cdot)\) in \cite{banks19:_local_statis_semid_progr_commun_detec} is slightly different from what's described here.
	For example, they use non-backtracking walks instead of self-avoiding walks.
	However, the same proof strategy works for both versions.
} to showing that with high probability the spectral norm satisfies the inequality,
\begin{equation}
	\label{eq:spectral-norm-distinguishing}
	\Norm{\super{\bar Q}s(\mathbf{Y}_0)} \leq o( \Norm{\super{\bar Q} s(\mathbf{Y}_1)})\,.
\end{equation}
Here, \(\super{\bar Q}s(Y)\) is a matrix-valued function obtained by composing the polynomial \(\super{Q}s(Y)\) together with a truncation step (we omit the details here).
In order to prove this inequality, the authors upper-bound (the expectation of) the left-hand side using the trace-method for a parameter \(t\) logarithmic in \(n\),
\[
\E \Norm{\super{\bar Q} s(\mathbf{Y}_0)}
\le \Paren{\E \Norm{\super{\bar Q} s(\mathbf{Y}_0)}^t}^{1/t}
\le \Paren{\Tr \E \super{\bar Q} s(\mathbf{Y}_0)^t}^{1/t}\,,
\]
and then lower-bound (the expectation of) the right-hand side using the planted labeling \(\mathbf{x}\) as a test vector,
\[
\E \Norm{\super{\bar Q} s(\mathbf{Y}_1)}
\ge \tfrac 1n \E \iprod{\mathbf{x},\super{\bar Q} s(\mathbf{Y}_1)\, \mathbf{x}}
\approx  \tfrac 1n \E \iprod{\mathbf{x},\super{ Q} s(\mathbf{Y}_1)\, \mathbf{x}} = n-1\,.
\]
In order to make their distinguishing algorithm robust, the authors consider the following related semidefinite program,
\begin{equation}
	\label{eq:q-sdp}
	\text{maximize}~\iprod{\super{\bar Q}s(Y), X}~
	\text{subject to}~X\succeq 0\,,~\forall i\,.~X_{ii}=1
	\,.
\end{equation}
The optimal value for \(Y=\mathbf{Y}_0\) is upper bounded by \(n\cdot \Norm{\super{\bar Q} s(\mathbf{Y}_0)}\).
At the same time, the optimal value for \(Y=\mathbf{Y}_1\) is lower bounded by the value of planted solution \(X=\dyad{\mathbf{x}}\), which is close to \(n\cdot (n-1)\).
This algorithm is robust because the objective function in \cref{eq:q-sdp} has low sensitivity to edge alterations for constant \(s\), every edge alteration changes the objective value by at most \(n\cdot (2/\e)^s\), which is a \(O_{\e,s}(1/n)\) fraction of the objective value of the planted solution (also see our discussion of the sensitivity of \cref{eq:hs17-relaxation}).

\paragraph{The push out effect and the challenges for weak-recovery using constant-length walks}

\textit{Can we use the matrix \(\super{\bar Q}s (\mathbf{Y})\) (with constant \(s\)) also for weak-recovery?}
Ignoring the issue of robustness for now, the fact that, for the stochastic block model, the spectral norm of \(\super{\bar Q}s (\mathbf{Y})\) is substantially larger  than for the corresponding \Erdos--\Renyi random graph suggests that the top eigenvector of this matrix carries useful information about the planted labeling.
Its top eigenvector can be characterized in terms of the optimal solution to the following semidefinite program,
\begin{equation}
	\label{eq:q-sdp-non-flat}
	\text{maximize}~\iprod{\super{\bar Q}s(\mathbf{Y}), Z}~
	\text{subject to}~Z\succeq 0\,, \Tr Z =1
	\,.
\end{equation}
In order to prove that the optimal solution is correlated with the planted labeling, we could try to mimic the analysis of the basic SDP.
To this end, we would have to prove that the spectral norm \(\norm{\super{\bar Q}s(\mathbf{Y})-\dyad{\mathbf{x}}}\) is smaller than the objective value of the planted solution \(\tfrac 1 n \dyad{\mathbf{x}}\).
However, this turns out to be false for \(d\) close to the Kesten--Stigum threshold, i.e.
\(	\iprod{\super{\bar Q}s(\mathbf{Y}), \tfrac 1 n \dyad{\mathbf{x}}}	<\norm{\super{\bar Q}s(\mathbf{Y})-\dyad{\mathbf{x}}}\).
Another ramification of the phenomenon is that for constant \(s\), the optimal value of \cref{eq:q-sdp-non-flat} is substantially larger than the objective value of the planted solution with high probability. 

Nevertheless, it is still possible to weakly recover the hidden communities. 
To understand how one could overcome this challenge, suppose that we could show the following inequality for the spectral norm (similar to the inequality \cref{eq:frobenius-correlation} for the Frobenius norm, for which however this bound does not hold),
\begin{align}\label{eq:techniques-push-out-spectral}
	\norm{\super{\bar Q}s(\mathbf{Y})-\mathbf{\dyad{x}}}\leq (1-\delta')\norm{\super{\bar Q}s(\mathbf{Y})}\,,
\end{align} 
for some small enough $\delta'> \Omega_\delta(1)$.
Then, even though $\dyad{\mathbf{x}}$ would still be far from being the optimal solution to the program, the communities vector would partially align with the  leading eigenvector of $\super{\bar Q}s(\mathbf{Y})$ and thus we would still be able to  weakly recover it.
(This phenomenon is closely related to the push-out effect in the context of principal component analysis).

In order to actually prove \cref{eq:techniques-push-out-spectral}, one  needs to  upper bound the spectral norm of $\super{\bar Q}s(\mathbf{Y})-\mathbf{\dyad{x}}$ and lower bound the spectral norm of $\super{\bar Q}s(\mathbf{Y})$. As in \cite{banks19:_local_statis_semid_progr_commun_detec}, a natural approach is that of using the trace method. Notice however that by our discussion above, compared to \cite{banks19:_local_statis_semid_progr_commun_detec}, in order to achieve weak recovery the required bounds  need to be significantly sharper. Furthermore, an additional technical difference is that one needs to carry out the trace method in the planted distribution $\sbm$ (and after an additional truncation step!). Indeed we are only aware of few instances where the trace method is carried out on the planted distribution  \cite{DBLP:conf/focs/BordenaveLM15, DBLP:conf/innovations/GulikersLM17}.

For several technical reasons we are only able to show 
\begin{align}\label{eq:techniques-push-out-schatten}
	\norm{\super{\bar Q}s(\mathbf{Y})-\mathbf{\dyad{x}}}_t\leq (1-\delta')\norm{\super{\bar Q}s(\mathbf{Y})}_t\,,
\end{align} 
for some Schatten norm $\Omega(\log n)\leq t < \log n$.
While for $t\gtrsim \log n$, the Schatten norm is within a constant factor of the spectral norm, in the delicate context of \cref{eq:techniques-push-out-schatten} this difference is not negligible as it might result in losing the correlation with the communities vector.
A similar reasoning as the one outlined for the spectral norm carries over to the context of Schatten norm. Using the dual definition of Schatten norm we consider 
the following optimization problem:
\begin{equation}
	\label{eq:q-sdp-non-flat-schatten}
	\text{maximize}~\iprod{\super{\bar Q}s(\mathbf{Y})^2, Z}~
	\text{subject to}~Z\succeq 0\,, \Tr Z^{\frac{t}{t-1}} =1
	\,,
\end{equation}
(Note that we squared the matrix polynomial here since otherwise its trace may not contain meaningful information).
The bound \cref{eq:techniques-push-out-schatten}  implies that we can weakly recover the planted solution from the optimal solution to \cref{eq:q-sdp-non-flat-schatten}.

\paragraph{Achieving robustness}

It is by now clear that, to enable robustness,  a key property is low sensitivity of the objective function to edge alterations.
For \cref{eq:q-sdp-non-flat-schatten}, however, the sensitivity of the objective function may be high because the program allows solutions that are spiky (in the sense that there could be some $i,j \in [n]$ such that $\Abs{Z_\ij}\geq \frac{\omega(1)}{n^2}\underset{i,j \in [n]}{\sum}\Abs{Z_\ij}$).
In the context of the basic SDP, we could fix the sensitivity of the objective function by adding a constraint on the diagonal entries (and hence, by semidefiniteness, on all entries).
The reason we could add  this constraint was because the communities matrix $\dyad{\mathbf{x}}$ --which satisfied the constraint-- was close to the optimal solution.
However, as we approach the KS threshold this is no longer true and thus it remains unclear whether this approach could work. Indeed, the recovery analysis outlined above did not compare arbitrary solutions to the communities matrix $\dyad{\mathbf x}$, but to an a-priori \textit{unknown} optimal solution!

\textit{How can we fix the sensitivity?}
While, with high probability, $\super{\bar Q}s(\mathbf{Y})$ is not positive semidefinite, its second power  $\super{\bar Q}s(\mathbf{Y})^2$ clearly is.  A crucial consequence of this property is  that the optimal solution to \cref{eq:q-sdp-non-flat-schatten} actually has a nice analytical expression: $\super{\bar Q}s(\mathbf{Y})^{2(t-1)}$. 
Due to the truncation, $\super{\bar Q}s(\mathbf{Y})$ is approximately flat and thus, using calculations similar to the trace method, we are able to  show that   $\super{\bar Q}s(\mathbf{Y})^{2(t-1)}$ is also approximately flat.
This implies that we can control the sensitivity of the objective function by replacing the program in \cref{eq:q-sdp-non-flat-schatten} with
\begin{equation}
	\label{eq:q-sdp-flat-schatten}
	\text{maximize}~\iprod{\super{\bar Q}s(\mathbf{Y})^2, Z}~
	\text{subject to}~Z\succeq 0\,, \Tr Z^{\frac{t}{t+1}} =1\,, \forall i, \, Z_{ii}\leq \tfrac{O(1)}{n}
	\,,
\end{equation}
which immediately yields a robust algorithm.

\paragraph{From the Schatten norm to graph counting} 
Proving the bound \cref{eq:techniques-push-out-schatten}  turns out to be the main technical challenge of the paper.
We outline here the main ideas. 
For simplicity we limit the current discussion to showing that the inequality holds in expectation (see \cref{sec:bounds-moments-Q}). By definition 
\begin{align}
	\label{eq:informal-trace-method}
	\underset{(\mathbf{x},\mathbf{G})\sim \SBM_n(d,\e)}{\E}\Brac{\Norm{\super{\bar Q}s(\mathbf{Y})}_t^t}&=\underset{(\mathbf{x},\mathbf{G})\sim \SBM_n(d,\e)}{\E} \Brac{\Tr\Paren{\super{\bar Q}s({\mathbf{Y}})}^t}\,,\\
	\label{eq:informal-trace-method-centered}
	\underset{(\mathbf{x},\mathbf{G})\sim \SBM_n(d,\e)}{\E}\Brac{\Norm{\super{\bar Q}s({\mathbf{Y}})-\dyad{\mathbf{x}}}_t^t}&=\underset{(\mathbf{x},\mathbf{G})\sim \SBM_n(d,\e)}{\E} \Brac{\Tr\Paren{\super{\bar Q}s({\mathbf{Y}})-\dyad{\mathbf{x}}}^t}
\end{align} 
Our approach to obtain  \cref{eq:techniques-push-out-schatten} will be to reduce \textit{both} trace computations to a graph counting problem. 
Consider \cref{eq:informal-trace-method}, each element  corresponds to a walk of the following form:
\begin{definition*}[Block self-avoiding walk]
	A closed walk of length $st$ is a $(s,t)$-\textit{block self-avoiding walk} if it can be split into $t$ self-avoiding walks of length $s$.
\end{definition*}
For a  $(s,t)$-block self avoiding walk $H$, we call  its $t$ self-avoiding walks $\Set{W_1,\ldots,W_t}$, the generating walks of $H$.\footnote{It is possible that there are multiple choices for the set $\Set{W_1,\ldots,W_t}$, at the granularity of this discussion we may assume such set to be given.} 
With this definition, \cref{eq:informal-trace-method} amount to computing the expected number of copies in the instance graph $\mathbf{G}$ of any $(s,t)$-block self-avoiding walk.

To see how to carry out this computation, and for simplicity,  consider the non-truncated graph $\mathbf{G}$ with its centered adjacency matrix $\mathbf{Y}$. While this simplification will make it impossible to get a good bound, it will be useful to build some intuition.
For a $(s,t)$-block self avoiding walk $H$, let $\mathbf{Y}_H=\underset{i\in [t]}{\prod}\mathbf{Y}_{W_i}$ where $W_1,\ldots,W_t$ are the self-avoiding walks generating $H$.
It is easy to see that 
\begin{align}
	\underset{(\mathbf{x},\mathbf{G})\sim \SBM_n(d,\e)}{\E} \Brac{\mathbf{Y}_H\given \mathbf{x}} &=\underset{(\mathbf{x},\mathbf{G})\sim \SBM_n(d,\e)}{\E} \Brac{\underset{ij\in E(H)}{\prod} \Yij^{m(ij)}\given  \mathbf{x}}\nonumber\\
	&=\underset{ij\in E(H)}{\prod}\underset{(\mathbf{x},\mathbf{G})\sim \SBM_n(d,\e)}{\E} \Brac{ \Yij^{m(ij)}\given  \mathbf{x}}\nonumber\\
	&\approx \Brac{\underset{\ij \in E_1(H)}{\prod}\Paren{\frac{\eps\cdot d}{2n}\mathbf{x}_i\mathbf{x}_j}}\cdot \Brac{\underset{\ij \in E_{\geq 2}(H)}{\prod}\Paren{1+\frac{\eps}{2}\mathbf{x}_i\mathbf{x}_j}\cdot \frac{d}{n}}\nonumber\\
	&= \Paren{\frac{d}{n}}^{\Card{E(H)}}\cdot \Brac{\underset{\ij \in E_1(H)}{\prod}\Paren{\frac{\eps}{2}\mathbf{x}_i\mathbf{x}_j}}\cdot \Brac{\underset{\ij \in E_{\geq 2}(H)}{\prod}\Paren{1+\frac{\eps}{2}\mathbf{x}_i\mathbf{x}_j}}\label{eq:informal-trace-expectation-sketch}\,,
\end{align}
where $m(ij)$ is the number of times the edge $ij$ appears in the walk $H$, $E_1(H)$ is the set of edges with multiplicity $m(ij)=1$ and $E_{\geq 2}(H)=E(H)\setminus E_1(H)$. 
The main burden is then to count for each $m_1,m_2\geq 0$ how many $(s,t)$-block self-avoiding walks we may have with $m_1$ edges with multiplicity one and $m_2$ edges of multiplicity at least $2$.

\paragraph{The effect of centering} To understand why $\dyad{\mathbf{x}}$  correlates with $\super{\bar Q}s({\mathbf{Y}})$, we need to observe what is the effect of subtracting the communities from $\super{\bar Q}s({\mathbf{Y}})$. Consider  \cref{eq:informal-trace-method-centered} and again,
for simplicity, let us simply use the non-truncated adjacency matrix and the corresponding polynomial $\Q(\mathbf{Y})$.

For any $(s,t)$-block self-avoiding walk $H$, let $W_{v_1v_2},\ldots,W_{v_tv_1}$ be its $t$ self-avoiding walks so that for each $W_{v_iv_{i+1}}$ the vertices $v_i,v_{i+1}$ correspond to its endpoint.\footnote{To simplify the notation we write $v_{t+1}$ for $v_1$.}
For every block self-avoiding walk $H$, there is a polynomial in  \cref{eq:informal-trace-method-centered} of the form
$\underset{i\in [t]}{\prod}\Paren{\mathbf{Y}_{W_{v_iv_{i+1}}}-\Paren{\frac{\eps\cdot d}{2n}}^s\cdot \mathbf{x}_{i}\cdot \mathbf{x}_{i+1}}$. The catch is that if one of the walks $W_{v_1v_2},\ldots,W_{v_tv_1}$ --for example $W_{v_tv_1}$-- has all edges with multiplicity one in $H$, then by \cref{eq:unbiased-adjacency-matrix}
\begin{align*}
	0=&	\underset{(\mathbf{x},\mathbf{G})\sim \SBM_n(d,\e)}{\E}  \Brac{\underset{i\in [t-1]}{\prod}\Paren{\mathbf{Y}_{W_{v_iv_{i+1}}}-\Paren{\frac{\eps\cdot d}{2n}}^s\cdot \mathbf{x}_{i}\cdot \mathbf{x}_{i+1}}}\cdot \underset{(\mathbf{x},\mathbf{G})\sim \SBM_n(d,\e)}{\E} \Brac{\mathbf{Y}_{W_{v_tv_1}}-\Paren{\frac{\eps\cdot d}{2n}}^s\cdot \mathbf{x}_{v_t}\mathbf{x}_{v_1}}\\
	=&\underset{(\mathbf{x},\mathbf{G})\sim \SBM_n(d,\e)}{\E} \Brac{\underset{i\in [t]}{\prod}\Paren{\mathbf{Y}_{W_{v_iv_{i+1}}}-\Paren{\frac{\eps\cdot d}{2n}}^s\cdot \mathbf{x}_{i}\cdot \mathbf{x}_{i+1}}}\,.
\end{align*}
So, all $(s,t)$-block self-avoiding walks with at least one of the generating walks having all edges traversed exactly \textit{once} in $H$ have expectation $0$ and do not contribute to $\underset{(\mathbf{x},\mathbf{G})\sim \SBM_n(d,\e)}{\E} \Brac{\Tr\Paren{\Q(\mathbf{Y})-\mathbf{x}\transpose{\mathbf{x}}}^t}$.  This effect will approximately carry over the truncated graph $\bar{\mathbf{G}}$ (with centered adjacency matrix $\bar{\mathbf{Y}})$.
Since block self-avoiding walks as the one described above turn out to have a significant contribution to  \cref{eq:informal-trace-method}, this observation will allow us to show
\begin{align}
	\label{eq:informal-separation-expectation}
	\underset{(x,\mathbf{G})\sim \SBM_n(d,\e)}{\E}\Brac{\Norm{\super{\bar Q}s({\mathbf{Y}})-\mathbf{x}\transpose{\mathbf{x}}}_t^t}\leq \Paren{1+\delta}^{-\Omega(t)}\cdot 	\underset{(\mathbf{x},\mathbf{G})\sim \SBM_n(d,\e)}{\E}\Brac{\Norm{\super{\bar Q}s({\mathbf{Y}})}_t^t}\,.
\end{align}

\paragraph{Boosting the probability of success}
The approach described so far yields an algorithm that weakly recovers the communities vector  with constant probability. The main reason behind this shortcoming is that we only showed the required structural inequalities hold with constant probability.\footnote{In fact, it is possible to prove that the structural inequalities hold with high probability without invoking the boosting argument. However, the needed calculations are more complicated. In any case, the boosting argument has the advantage of achieving exponentially high probability.} It turns out, however, that we can exploit robustness to argue that, indeed, we can weakly recover $\mathbf{x}$ with high probability.

This boosting argument relies on a concentration of measure inequality known as the \emph{blowing-up lemma} (see e.g., \cite{AhlswedeEtAlBlowingUp, MartonBlowingUp1, MartonBlowingUp2, ConcentrationOfMeasureRaginskySason}), which is widely used in information theory to prove strong converse results.\footnote{A strong converse result in information theory typically has the following form: If the rate of a code is above the capacity of a channel, then the probability of error of the code goes to 1 as the blocklength becomes large.} Roughly speaking, the blowing-up lemma states that if $\mathbf{z}_1,\ldots,\mathbf{z}_N$ are $N$ independent random variables taking values in a finite set, and $\mathcal{E}_N$ is an event on $\mathbf{z}_1,\ldots,\mathbf{z}_N$ whose probability does not decay exponentially in $N$, i.e., $\displaystyle\lim_{N\to\infty}\frac{1}{N}\log\frac{1}{\mathbb{P}[\mathcal{E}_N]}=0$, then there exists $\ell_N=o(N)$ such that, if we "inflate" $\mathcal{E}_N$ by adding all the strings $(z_1,\ldots,z_N)$ that are at a Hamming distance of at most $\ell_N=o(N)$ from $\mathcal{E}_N$, then the probability of the obtained event becomes $1-o(1)$.

In our context of interest, this means that we do not need to directly show that our graph $\mathbf{G}\sim \sbm$ satisfies the desired structural inequalities (such as  \cref{eq:techniques-push-out-schatten}). It suffices that \textit{some} graph $\tilde{G}\in N_{o(1)}(\mathbf{G})$ satisfies them. Since our algorithm is robust to a constant fraction $\rho$ of adversarial changes, and since $N_{\rho - o(1)}(\mathbf{G})\subseteq N_{\rho}(\tilde{G})$, we can see that by paying a negligible price in the robustness and correlation guarantees, we can use the blowing-up lemma to boost the probability of success of the algorithm to $1-o(1)$. 

It is worth noting that the boosting argument can give us exponentially high probability and not only $1-o(1)$. Furthermore, the argument is not unique to the stochastic block model and can be applied to a wide range of estimation problems.

\section{Preliminaries}\label{sec:preliminaries}

We present here some elementary definitions and results that we will use in the following sections. 

\subsection{General definitions and notation}\label{sec:definitions-notation}

For sets $S,S'$ we denote by $S\times S'$ their Cartesian product.
We denote graphs by $G=(V,E)$ where $V$ is the set of vertices and $E$ the set of edges.
For a multigraph $H=(V,M)$, we denote by $V(H)$ the set of vertices in $H$, by $M(H)$ the multi-set of edges in $H$ and by $E(H)$ the set of \textit{distinct} edges. For $e \in E(H)$, we let $m_H(e)$ be the multiplicity of edge $e$ in $H$. For $v\in V(H)$ we denote by $d^H(v)$ the number of \textit{distinct} edges incident to $v$ in $H$. Given multi-sets $S_1,S_2$ we denote their union by $S_1\oplus S_2$, similarly for multi-graphs $H_1, H_2$ we write $H_1 \oplus H_2$ for the multi-graph $H$ with vertex set $V(H_1)\cup V(H_2)$ and edges $M(H_1)\oplus M(H_2).$
For a set of vertices (or edges) $S$, we denote by $H(S)$ the subgraph of $H$ induced by the set $S$. For a graph $G\subseteq H$ we denote by $H(G)$ the multigraph induced by $V(G)$.  For an edge $e$ and a multigraph $H$, we  write $H+e$ for the multigraph obtained adding $e$ to $H$.
We denote by $K_n$ be the complete graph on $n$ vertices.

\begin{remark}\label{rem:notation}
	All graphs we consider in the paper will have vertex sets that are subsets of $[n]$.
\end{remark}

\begin{definition}[Walk]
	A walk $W$ over $K_n$ is a sequence of vertices and edges $v_1,e_1,v_2,\ldots,v_s$ such that for each  $i \in [s]\subset 
	[n]$ the endpoints of $e_i$ are $v_i$ and $v_{i+1}$. We say $W$ is a \textit{self-avoiding walk} if no vertex is visited twice.
\end{definition}

For a walk $W$ over $K_n$, we denote by $M(W)$ the sequence of edges in $W$. We write $E(W)$ for the set of \textit{distinct} edges in $W$ and $V(W)$ for the set of \textit{distinct} vertices. For simplicity, we also use $W$ to refer to the multigraph with vertex set $V(W)$ and edges $M(W)$.

\begin{definition}[Eulerian multi-graph]
	We say that a multigraph $H$ is Eulerian if there exists a closed walk $W$ in $H$ such that for any $e \in E(H)$, $e$ appears in $W$ exactly $m_H(e)$ times.
\end{definition}

\begin{definition}[Cut]
	For a multi-graph $H$ with vertex set $V$, a cut $(B, V\setminus B)$ is a partition of the vertices into two disjoint subsets. We denote with $(B, V\setminus B)$ the edges in the cut and by $H(B, V\setminus B)$ the multigraphs spanned by those edges in $H$ .
	A $q$-multi-way cut is a partition $(B_1,\ldots,B_, V\setminus (B_1 \cup \ldots \cup B_q))$ of the vertices in $H$ in $q$ disjoint subsets.
\end{definition}

\begin{definition}[Isomorphic graphs]
	Two graphs $G=(V,E)$ and $G'=(V',E')$ are isomorphic if there exists a bijective mapping $\phi:V\rightarrow V'$ such that for any $u,v \in V$, $\Set{u,v}\in E$ if and only if $\Set{\phi(u), \phi(v)}\in E'$.
\end{definition}

For a graph $G$ with $V(G)\subseteq [n]$ and vertices $i,j \in V(G)$, we let $\saw{ij}{s}(G)$ be the set of self-avoiding walks between $i$ and $j$ in $G$ of length $s$.
We write $K_n$ for the complete graph on $n$ vertices and for $i,j \in [n], s\geq 1 $, we use $\saw{ij}{s}$ to denote the set of self-avoiding walks between $i$ and $j$ in $K_n$. 
We let $\saw{i}{s}(G):=\underset{j \in [n]}{\bigcup} \saw{ij}{s}(G)$.

\subsection{Norms}

\begin{definition}[Spectral norm]
		For a matrix $M\in \R^{n\times n}$ we denote with $\Norm{M}$ its spectral norm.
\end{definition}

\begin{definition}[$L_1$ norm]
	For a matrix $M\in \R^{n\times n}$ we denote with $\Normo{M}$ its $L_1$ norm:
	\begin{align*}
		\Normo{M}= \underset{i,j \in [n]}{\sum} \Abs{M_\ij}\,.
	\end{align*}
\end{definition}

\begin{definition}[Infinity norm]
	For a matrix $M\in \R^{n\times n}$ we denote with $\Normi{M}$ its infinity norm:
	\begin{align*}
		\Normi{M}= \underset{i,j \in [n]}{\max} \Abs{M_\ij}\,.
	\end{align*}
\end{definition}

\begin{definition}[Nuclear norm]
	For a matrix $M\in \R^{n\times n}$ we denote with $\Normn{M}$ its nuclear norm:
	\begin{align*}
		\Normn{M}= \underset{i \in [n]}{\sum} \Abs{\lambda_i(M)}\,.
	\end{align*}
	where $\lambda_i(M)$ is the $i$-th eigenvalue of matrix $M$.
\end{definition}

\begin{definition}[Schatten norm]
	For a matrix $M\in \R^{n\times n}$ and $t\geq 1$ we denote by $\Norm{M}_t$ its $t$-Schatten norm:
	\begin{align*}
		\Norm{M}_t = \Paren{\Tr\Paren{M^t}}^{1/t}\,.
	\end{align*}
	Notice that for $t=\infty$ we recover the spectral norm of $M$.
\end{definition}

\subsection{Polynomial statistics}\label{sec:expectation-of-polynomials}
We formally present here the polynomials used in the subsequent sections.
Let $Y$ be the $n$-by-$n$ adjacency matrix of a graph $G$ with vertex set $[n]$, centered with respect to the \Erdos-\Renyi distribution with parameter $d$. 
That is for $a,b \in [n]$,
\begin{align}\label{eq:definition-centered-adjacency-matrix}
	Y_{ab}(G)=\begin{cases}
		1-\frac{d}{n}&\text{ if } ab \in E(G),\\
		-\frac{d}{n}&\text{ if } ab \notin E(G),\\
		0 &\text{if } a=b\,.
	\end{cases}
\end{align}
When the context is clear we write $Y$ instead of $Y(G)$.
For simplicity, for a multigraph $H$ with vertex set $V(H)\subseteq [n]$, we write $Y_H$ for the polynomial
\begin{align*}
	Y_H:=\underset{ab \in E(H)}{\prod} Y^{m(ab)}_{ab}\,.
\end{align*}

For fixed parameters $n,d,s,\epsilon$ and  for $i,j \in [n]$, we define the polynomial 
\begin{align}\label{eq:definition-polynomial-q}
	Q_{ij}^{(s)}(Y) = 
	\begin{cases}
		\frac{1}{\card{\saw{ij}{s}}}\Paren{\frac{2n}{\eps \cdot d}}^{s}\underset{H\in \saw{ij}{s}}{\sum} Y_H & \text{if $i\neq j$\,,}\\
		0 &\text{otherwise.}
	\end{cases}
\end{align}

$Q^{(s)}(Y):\R^{n\times n}\rightarrow \R^{n\times n}$ is then the matrix valued polynomial with entry $ij$ equal to $Q_{ij}^{(s)}(Y)$.
Notice that $\Card{\saw{ij}{s}}=(n-2)^{\underline{s-1}}$ where $(n-2)^{\underline{s-1}}$ is the falling factorial $(n-2)\cdot (n-3)\cdots(n-2-s+1)$.

As already mentioned, the polynomial \cref{eq:definition-polynomial-q} is an unbiased estimators of the vector of communities $\mathbf{x}$ with respect to the distribution $\sbm$. 
The following facts formally show this and other basic properties of the polynomials \cref{eq:definition-centered-adjacency-matrix} and \cref{eq:definition-polynomial-q}.

\begin{fact}\label{fact:sbm-expectation-polynomial-2}
	Let $ \mathbf{G}\sim \sbm\paren{ \mathbf{Y} | \mathbf{x}}$ and let $ \mathbf{Y}$ be its centered adjacency matrix. Let $F$ be  a simple graph. Then 
	\begin{align*}
		\E_{\sbm} \Brac{\mathbf{Y}_F \given \mathbf{x}} &= \Paren{\frac{d\cdot\eps}{2n}}^{\Abs{E(F)}}\underset{ab \in E(F)}{\prod}\mathbf{x}_{a}\mathbf{x}_b\,.
	\end{align*}
	\begin{proof}
		\begin{align*}
			\E \Brac{\mathbf{Y}_F \given \mathbf{x}} &= \underset{ab \in E(F)}{\prod} \E \Brac{\mathbf{Y}_{ab}\given \mathbf{x}}\\
			&=  \underset{ab \in E(F)}{\prod}\Brac{\Paren{1+\frac{\eps}{2} \mathbf{x}_a\mathbf{x}_b}\frac{d}{n}\Paren{1-\frac{d}{n}} - \Paren{1-(1+\frac{\eps}{2} \mathbf{x}_a\mathbf{x}_b)\frac{d}{n}}\frac{d}{n}} \\
			&= \underset{ab \in E(F)}{\prod} \frac{d}{n}\Brac{\Paren{1-\frac{d}{n}} -\Paren{1-\frac{d}{n}}+ \frac{\eps}{2} \mathbf{x}_a\mathbf{x}_b \Paren{1-\frac{d}{n}+\frac{d}{n}}}\\
			&= \underset{ab \in E(F)}{\prod} \frac{d\cdot\eps}{2n}\mathbf{x}_a\mathbf{x}_b\,.
		\end{align*}
	\end{proof}
\end{fact}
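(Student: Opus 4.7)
The plan is to exploit two features of the centered adjacency matrix $\mathbf{Y}$: the conditional independence of its entries given $\mathbf{x}$, and the fact that, for a \emph{simple} graph $F$, the polynomial $\mathbf{Y}_F = \prod_{ab\in E(F)} \mathbf{Y}_{ab}$ involves each edge variable with multiplicity one. So the product factorizes cleanly and the only nontrivial work is a one-edge expectation calculation.

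First I would note that by the definition of $\SBM_n(d,\e)$, the edges of $\mathbf{G}$ are drawn independently given $\mathbf{x}$, hence the random variables $\{\mathbf{Y}_{ab}\}_{ab\in\binom{[n]}{2}}$ are mutually independent conditioned on $\mathbf{x}$. Since $F$ is simple,
\begin{align*}
\E_{\SBM}\bigl[\mathbf{Y}_F \bigm| \mathbf{x}\bigr] = \prod_{ab\in E(F)} \E_{\SBM}\bigl[\mathbf{Y}_{ab}\bigm|\mathbf{x}\bigr]\mper
\end{align*}

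Next I would compute the per-edge conditional expectation directly from the definition of $\mathbf{Y}_{ab}$ in \cref{eq:definition-centered-adjacency-matrix} and the edge probability $(1+\tfrac{\e}{2}\mathbf{x}_a\mathbf{x}_b)\cdot\tfrac{d}{n}$:
\begin{align*}
\E_{\SBM}\bigl[\mathbf{Y}_{ab}\bigm|\mathbf{x}\bigr]
&= \Paren{1+\tfrac{\e}{2}\mathbf{x}_a\mathbf{x}_b}\tfrac{d}{n}\Paren{1-\tfrac{d}{n}} + \Paren{1-\Paren{1+\tfrac{\e}{2}\mathbf{x}_a\mathbf{x}_b}\tfrac{d}{n}}\Paren{-\tfrac{d}{n}} \\
&= \tfrac{d}{n}\Brac{\Paren{1+\tfrac{\e}{2}\mathbf{x}_a\mathbf{x}_b}\Paren{1-\tfrac{d}{n}} - 1 + \Paren{1+\tfrac{\e}{2}\mathbf{x}_a\mathbf{x}_b}\tfrac{d}{n}} \\
&= \tfrac{d}{n}\cdot \tfrac{\e}{2}\mathbf{x}_a\mathbf{x}_b = \tfrac{d\e}{2n}\mathbf{x}_a\mathbf{x}_b\mper
\end{align*}
The $-d/n$ terms cancel exactly, which is the whole point of centering.

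Combining these two steps gives the claim:
\begin{align*}
\E_{\SBM}\bigl[\mathbf{Y}_F\bigm|\mathbf{x}\bigr] = \prod_{ab\in E(F)}\tfrac{d\e}{2n}\mathbf{x}_a\mathbf{x}_b = \Paren{\tfrac{d\e}{2n}}^{\card{E(F)}}\prod_{ab\in E(F)}\mathbf{x}_a\mathbf{x}_b\mper
\end{align*}
There is no real obstacle here: the only subtlety worth flagging is the simplicity hypothesis on $F$, which is what lets us pass from $\mathbf{Y}_F = \prod_{ab\in E(F)} \mathbf{Y}_{ab}^{m(ab)}$ to a product over independent Bernoulli-derived variables without higher moments of $\mathbf{Y}_{ab}$ appearing. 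For multigraphs one would instead need $\E[\mathbf{Y}_{ab}^k\mid\mathbf{x}]$ for $k\ge 2$, which is not quite as clean and is why the later trace-method calculations (\cref{eq:informal-trace-expectation-sketch}) treat edges of multiplicity one and multiplicity $\ge 2$ separately.
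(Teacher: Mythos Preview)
Your proof is correct and essentially identical to the paper's: both factorize over edges using conditional independence given $\mathbf{x}$ and then compute the single-edge expectation $\E[\mathbf{Y}_{ab}\mid\mathbf{x}]=\tfrac{d\eps}{2n}\mathbf{x}_a\mathbf{x}_b$ via the same two-case calculation. Your added remark on why simplicity of $F$ is needed is a nice clarification that the paper leaves implicit.
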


Notice that if $F$ is a self-avoiding walk of length-$s$ between vertices $i,j \in [n]$ then $\E\Brac{\mathbf{Y}_F\given \mathbf{x}} = \Paren{\frac{d\cdot \eps}{2n}}^{s} \mathbf{x}_i\mathbf{x}_j$.
The expectation of small powers of $\mathbf{Y}_{ab}$ is also easy to approximate.
We write $f(n)= (1\pm o(1))g(n)$ if $(1-o(1))\cdot g(n)\leq f(n)\leq (1+o(1))\cdot g(n)$. 

\begin{fact}\label{fact:sbm-expectation-powers-polynomial-2}
	Let $a,b \in [n]$ and let $2\leq q \lesssim \log n$ be some integer. Then
	\begin{align*}
			\E_{\sbm} \Brac{\mathbf{Y}_{ab}^q \given \mathbf{x}} &=  \Paren{1\pm o\Paren{\frac{1}{n^{0.99}}}} \Paren{1+\frac{\eps}{2}\mathbf{x}_a\mathbf{x}_b}\frac{d}{n}\,.
	\end{align*}
	\begin{proof}
		By choice of $q$ and since $d$ is a constant,
		\begin{align*}
			\E \Brac{\mathbf{Y}_{ab}^q \given \mathbf{x}} &=  \Paren{1+\frac{\eps}{2} \mathbf{x}_a\mathbf{x}_b}\frac{d}{n}\Paren{1-\frac{d}{n}}^q - \Paren{1-(1+\frac{\eps}{2} \mathbf{x}_a\mathbf{x}_b)\frac{d}{n}}\Paren{\frac{d}{n}}^q \\
			&= \Paren{1\pm o\Paren{\frac{1}{n^{0.99}}}} \Paren{1+\frac{\eps}{2}\mathbf{x}_a\mathbf{x}_b}\frac{d}{n}\,.
		\end{align*}
	\end{proof}
\end{fact}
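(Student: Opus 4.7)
The plan is a short direct calculation exploiting the fact that, conditioned on $\mathbf{x}$, the random variable $\mathbf{Y}_{ab}$ is two-valued. Set $p \defeq \Paren{1 + \tfrac{\eps}{2}\, \mathbf{x}_a \mathbf{x}_b}\cdot \tfrac{d}{n}$, the conditional probability that the edge $\{a,b\}$ appears in $\mathbf{G}$. Then $\mathbf{Y}_{ab} = 1 - d/n$ with probability $p$ and $\mathbf{Y}_{ab} = -d/n$ with probability $1-p$, giving
\[
\E_{\sbm}\Brac{\mathbf{Y}_{ab}^q \given \mathbf{x}} = p\Paren{1 - \tfrac{d}{n}}^q + (-1)^q (1-p)\Paren{\tfrac{d}{n}}^q.
\]

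The remaining work is to show that both deviations of the right-hand side from $p$ are of relative size $o(1/n^{0.99})$. For the first term, since $d$ is a constant and $q \lesssim \log n$, a Taylor expansion (or Bernoulli's inequality) yields $(1 - d/n)^q = 1 - O(qd/n) = 1 - O(\log n / n)$, and $\log n / n = o(1/n^{0.99})$. For the second term, the assumption $q \geq 2$ gives a magnitude of at most $(d/n)^q \leq (d/n)^2 = O(1/n^2)$, whereas $p \geq \Omega(1/n)$; the ratio is thus $O(1/n) = o(1/n^{0.99})$. Combining the two estimates with the decomposition above gives the claimed approximation.

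I do not anticipate any real obstacle: the statement is essentially a routine first-order expansion of a two-point distribution. The only mild bookkeeping point is to verify that the error tolerance $o(1/n^{0.99})$, slightly tighter than $o(1/n)$, still comfortably accommodates both the Taylor correction of order $\log n / n$ and the second term of order $1/n$; both inequalities hold with room to spare since $d$ is constant and $q$ is at most polylogarithmic in $n$.
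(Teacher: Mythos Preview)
Your argument is correct and follows exactly the same two-point computation as the paper: write the conditional expectation as $p(1-d/n)^q + (-1)^q(1-p)(d/n)^q$ and observe that both the $(1-d/n)^q$ correction and the $(d/n)^q$ term are of relative size $o(1/n^{0.99})$. If anything, your treatment of the sign $(-1)^q$ is slightly more careful than the paper's one-line version.
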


\section{Robust recovery meta-algorithm}
\label{sec:scale-free-recovery-algorithm}

\newcommand{\optref}[1]{\opt_{\text{#1}}}
In this section we consider a more general problem than the stochastic block model with constant average degree.
We show how to robustly recover the hidden structure even though it may have a low objective value in our optimization problem. We apply this algorithm to the stochastic block model in \cref{sec:sbm-recovery}.

\begin{problem}[Small-Correlation Robust Signal Recovery]\label{problem:scale-free-robust-recovery}
	Let $v\in \set{\pm 1/\sqrt{n}}^n$ be a unit flat vector and  $t=\frac{1}{C^*}\cdot \log n$ for some constant $C^*>0$. For an \textit{unknown} matrix $Q \in \R^{n\times n}$ --weakly correlated with the \textit{unknown} signal $v$-- and an \textit{observed} matrix $\tilde{Q}\in \R^{n\times n}$ such that the tuple $(Q,\tilde{Q},v)$  satisfies the conditions \cref{eq:properties-of-Q} with parameters $\alpha,\beta,\gamma,\delta^*,\zeta, C^*\geq 0$. The goal is to \textit{return} a unit vector $\hat{v}$ such that $\iprod{\hat{v}, v}^2\geq \Omega(1)$.
\end{problem}

The set of conditions is:
\begin{equation}\label{eq:properties-of-Q}\tag{\(\cA_{\alpha,\beta,\gamma,\delta^*,\zeta, C^*}\)}
	\cA_{\alpha,\beta,\delta^*,\gamma,\zeta, C^*}\colon
	\left \{
	\begin{aligned}
		&\text{correlation:}& \Tr Q^{2t}=1&\\
		&&\Tr \Paren{Q-\alpha\cdot v \transpose{v}}^{2t}\leq \Paren{1-\delta^*}^{2t}&\\
		\\
		&\text{sensitivity:}& \underset{i \in [n]}{\sum}\Paren{Q^{2t-2}}^2_{ii}\leq \frac{\gamma}{n}\cdot \Paren{\Tr Q^{2t-2}}^2&\\
		&&\max_{i \in [n]}\underset{j \in [n]}{\sum}\Abs{Q_{ij}}\leq \zeta&\\
		&\text{perturbations:}&\frac{1}{n}\Normo{Q-\tilde{Q}}\leq\beta&\\
		&&\max_{i \in [n]}\underset{j \in [n]}{\sum}\Abs{\tilde{Q}_{ij}}\leq \zeta&\\
	\end{aligned}
	\right \}
\end{equation} 

The correlation conditions enforce the matrix $Q$ to contain non-trivial information about $v$. Here $\delta^*$ is the parameter quantifying  the correlation between  $Q$ and the planted solution.\footnote{For the stochastic block model, $\delta^*$ will be a polynomial in $\delta$ for $d\geq (1+\delta)\frac{4}{\eps^2}$.}
The sensitivity conditions (with parameters $\gamma,\zeta$) are necessary so that perturbations cannot completely hide the signal. They roughly correspond to the flatness property discussed in \cref{sec:techniques}. Finally, the perturbations inequalities dictate how far the original matrix $Q$ is from its corrupted observation $\tilde{Q}$.

To solve \cref{problem:scale-free-robust-recovery}, we will use the following algorithm. 
\begin{algorithmbox}[Scale-free robust recovery]\label{alg:scale-free-recovery}
	\mbox{}\\
	\textbf{Settings:}  Let $(Q,\tilde{Q},v)$ be an instance of \cref{problem:scale-free-robust-recovery} satisfying \cref{eq:properties-of-Q} with parameters $\alpha, \beta,\gamma, \delta^*, \zeta\geq 0, C^*>0$.
	
	\noindent 
	\textbf{Input:} $\tilde{Q}\in \R^{n\times n}, \gamma\geq0, C^*>0 $.
	\begin{enumerate}
		\item Let $t^*=(1-1/t)^{-1}$. Compute a matrix $Z\in \R^{n\times n}$ solving the program \begin{equation}\label{eq:program-scale-free-main}\tag{P.1}
			\begin{array}{ll@{}ll}
				&\text{maximize }  & \iprod{Z, \tilde{Q}^2} \qquad 
				\text{subject to}& \left\{
				\begin{aligned}
					\Tr Z^{t^*} \leq 1&\\
					Z \sge 0&\\
					\forall i \in [n], \quad Z_{ii}\leq \frac{\gamma^2}{n}&
				\end{aligned}
				\right\}
			\end{array}
		\end{equation} 
		\item Return $M:=Z\tilde{Q}+\tilde{Q}Z$.
	\end{enumerate}
\end{algorithmbox}

Notice that the constraints of program~\ref{eq:program-scale-free-main} are convex and can be checked in polynomial time, hence the whole algorithm runs in polynomial time.

\cref{alg:scale-free-recovery}, will allow us to approximately recover the unknown vector $v$ with constant probability. Concretely, we will use it to prove the central theorem below.
\begin{theorem}\label{thm:scale-free-main}
	Let $Q,\tilde{Q}\in \R^{n\times n}$, $v\in \Set{\pm 1/\sqrt{n}}^n$ form an instance of \cref{problem:scale-free-robust-recovery}. Suppose that $(Q,\tilde{Q},v)$ satisfies \cref{eq:properties-of-Q} with parameters $\alpha,\beta, \gamma, \delta^*,\zeta\geq0, C^*>0$ and that
	\begin{align}
		\label{eq:scale-free-approximability}
		&\text{approximability: }\qquad  &e^{2C^*}\cdot \frac{\zeta^2}{\sqrt{\gamma}}\leq \frac{\delta^*}{4}\\
		\label{eq:scale-free-robustness}
		&\text{robustness: }\qquad &8\zeta\cdot \gamma^2\cdot \beta\leq \frac{\delta^*}{4}\,.
	\end{align}
	Then, given $\tilde{Q}, \gamma$ there exists a polynomial time algorithm (\cref{alg:scale-free-recovery}) that computes a matrix $M$ with $\normn{M}\leq O(1)$ such that
	\begin{align*}
		\iprod{M, \dyad v}\geq \Paren{\frac{\delta^*}{2\alpha}-\frac{\gamma^2\cdot \beta}{n}}\cdot  \Normn{M}\,.
	\end{align*}
\end{theorem}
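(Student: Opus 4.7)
The plan is to lower-bound $\iprod{M, vv^\top}$ via Schatten--H\"older duality by relating it to the program's objective value $\iprod{Z, \tilde{Q}^2}$. The core algebraic identity comes from decomposing $\tilde{Q} = \alpha vv^\top + (\tilde{Q} - \alpha vv^\top)$ and using the symmetry relation $\iprod{Z\tilde{Q}, vv^\top} = v^\top Z\tilde{Q}v = \tfrac{1}{2}\iprod{M, vv^\top}$:
\begin{equation*}
	\iprod{Z, \tilde{Q}^2} = \iprod{Z\tilde{Q}, \tilde{Q}} = \frac{\alpha}{2}\iprod{M, vv^\top} + \iprod{Z\tilde{Q}, \tilde{Q} - \alpha vv^\top}\mper
\end{equation*}
Rearranging, $\iprod{M, vv^\top} = \tfrac{2}{\alpha}\bigl(\iprod{Z, \tilde{Q}^2} - \iprod{Z\tilde{Q}, \tilde{Q} - \alpha vv^\top}\bigr)$.

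First I would bound the cross term using Schatten--H\"older with conjugate exponents $\tfrac{2t}{2t-1}, 2t$, combined with the product H\"older $\|Z\tilde{Q}\|_{2t/(2t-1)} \leq \|Z\|_{t^*}\|\tilde{Q}\|_{2t} \leq \|\tilde{Q}\|_{2t}$ (since $\|Z\|_{t^*}\leq 1$ and $\tfrac{1}{t^*}+\tfrac{1}{2t} = \tfrac{2t-1}{2t}$). The triangle inequality and the correlation hypothesis $\|Q-\alpha vv^\top\|_{2t} \leq 1-\delta^*$ then yield
\begin{equation*}
|\iprod{Z\tilde{Q}, \tilde{Q}-\alpha vv^\top}| \leq \|\tilde{Q}\|_{2t}\bigl[(1-\delta^*) + \|\tilde{Q}-Q\|_{2t}\bigr]\mper
\end{equation*}
The contribution of $\|\tilde{Q}-Q\|_{2t}$ is further controlled via the entrywise bound $|Z_{ij}| \leq \sqrt{Z_{ii}Z_{jj}} \leq \gamma^2/n$, which gives $\|Z\tilde{Q}\|_{\max}\leq \gamma^2\zeta/n$, and together with the hypothesis $\|\tilde{Q}-Q\|_{\text{sum}} \leq n\beta$ this produces the promised $\gamma^2\beta/n$ scaling after normalization by $\|M\|_{\text{nuc}}$.

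Next I would lower-bound $\iprod{Z, \tilde{Q}^2}$ by exhibiting a feasible test. The unconstrained Schatten duality gives $\max\iprod{Z', \tilde{Q}^2} = \|\tilde{Q}\|_{2t}^2$, attained by $\tilde{Q}^{2(t-1)}/\|\tilde{Q}^{2(t-1)}\|_{t^*}$. Using the sensitivity condition $\sum_i (Q^{2(t-1)})_{ii}^2 \leq (\gamma/n)(\Tr Q^{2(t-1)})^2$ and the power-means bound $\Tr Q^{2(t-1)}\leq n^{1/t} = e^{C^*}$ (which follows from $\|Q\|_{2t}=1$), the matrix $Q^{2(t-1)}$ is approximately flat; after transferring this bound to $\tilde{Q}^{2(t-1)}$ via perturbation, I would cap its diagonal entries while preserving positive-semidefiniteness to construct a feasible witness $Z^\star$ with $Z^\star_{ii}\leq \gamma^2/n$. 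The approximability condition $e^{2C^*}\zeta^2/\sqrt{\gamma} \leq \delta^*/4$ quantifies that this capping costs at most a $\delta^*/4$ fraction of $\|\tilde{Q}\|_{2t}$ in the objective. Finally, Schatten--H\"older yields $\|M\|_{\text{nuc}} \leq 2\|Z\|_{t^*}\|\tilde{Q}\|_t \leq 2n^{1/t}\|\tilde{Q}\|_{\text{op}} \leq 2e^{C^*}\zeta = O(1)$, using the row-sum bound $\zeta$ to control the operator norm. Substituting the bounds into the identity and using $\|\tilde{Q}\|_{2t} \geq 1 - \|\tilde{Q}-Q\|_{2t}$ yields the desired inequality.

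The main obstacle is the construction of the feasible witness $Z^\star$ in the penultimate step. A naive uniform rescaling $\eta\, \tilde{Q}^{2(t-1)}$ forced to satisfy the diagonal constraint requires $\eta = O(\gamma^{3/2}/\sqrt{n})$, which annihilates the objective. Instead one must cap the diagonal of $\tilde{Q}^{2(t-1)}$ entrywise while preserving positive-semidefiniteness, and then quantify the loss using the approximability condition. A secondary delicate point is converting the $L^1$-perturbation into an $O(\gamma^2\beta/n)$ contribution relative to $\|M\|_{\text{nuc}}$; this conversion hinges essentially on the entrywise bound $|Z_{ij}| \leq \gamma^2/n$ forced by the diagonal constraint in the program.
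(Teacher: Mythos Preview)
Your linear decomposition and Schatten--H\"older step are correct as written, but applying them to $\tilde{Q}$ rather than $Q$ creates a gap you cannot close with the stated hypotheses. After the triangle inequality you need $\|\tilde{Q}-Q\|_{2t}$ (equivalently, $\bigl|\|\tilde{Q}\|_{2t}-1\bigr|$) to be small compared to $\delta^*$. But the perturbation hypothesis is the $L^1$-type bound $\|Q-\tilde{Q}\|_{\mathrm{sum}}\le n\beta$, which together with the row-sum bound gives only $\|\tilde{Q}-Q\|_{2t}\le n^{1/(2t)}\|\tilde{Q}-Q\|_{\mathrm{op}}\le 2e^{C^*/2}\zeta$. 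The approximability condition constrains $\zeta^2/\sqrt{\gamma}$, not $\zeta$ itself; in the intended regime $\gamma$ is taken large precisely to absorb a large $\zeta$, so $\zeta$ need not be $o(\delta^*)$. Substituting into your inequality leaves an $O(\zeta)$ deficit that swamps the $\delta^*$ gain. Your remark that the entrywise bound $|Z_{ij}|\le\gamma^2/n$ controls ``the contribution of $\|\tilde{Q}-Q\|_{2t}$'' conflates two different pairings: max--sum duality bounds $|\iprod{Z\tilde{Q},\tilde{Q}-Q}|$ by $\gamma^2\zeta\beta$, but does nothing for the $\|\tilde{Q}\|_{2t}(1-\delta^*)$ term that remains.

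The paper avoids this by running the correlation argument entirely in $Q$, where the Schatten hypothesis lives. It uses the \emph{quadratic} expansion $Q^2=(Q-\alpha vv^\top)^2+\alpha\bigl[(Q-\alpha vv^\top)vv^\top+vv^\top(Q-\alpha vv^\top)\bigr]+\alpha^2 vv^\top$, so the term to control is $\iprod{Z,(Q-\alpha vv^\top)^2}\le \|Z\|_{t^*}\|Q-\alpha vv^\top\|_{2t}^2\le(1-\delta^*)^2$, with no stray $\|\tilde{Q}\|_{2t}$ factor. This yields $\iprod{ZQ+QZ,vv^\top}\ge\delta^*/(2\alpha)$ whenever $\iprod{Z,Q^2}\ge 1-\delta^*/2$. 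All transfers between $Q$ and $\tilde{Q}$ are then done at the level of linear forms in $Z$---namely $|\iprod{Z,Q^2-\tilde{Q}^2}|$ and $|\iprod{Z(Q-\tilde{Q}),vv^\top}|$---where the entrywise bound on $Z$ pairs correctly with the sum-norm bound on $Q-\tilde{Q}$. Your witness construction has the same defect: the sensitivity hypothesis is on the diagonal of $Q^{2t-2}$, and there is no mechanism to transfer flatness to $\tilde{Q}^{2t-2}$ under an $L^1$ perturbation of the base matrix. The paper builds the witness from $Q^{2t-2}$ (projecting out columns whose diagonal entry exceeds $\gamma^2/n$) and then moves the resulting objective value from $Q^2$ to $\tilde{Q}^2$ via the same $|\iprod{Z,Q^2-\tilde{Q}^2}|\le 4\zeta\gamma^2\beta$ bound.
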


Before proving the theorem, let's try to understand its conditions and meaning.
First, notice that if we had access to $Q$ then simply computing the matrix maximizing the $t$-Schatten norm of $Q^2$ would result in a matrix $\delta$-correlated with $\dyad{v}$.\footnote{We need to use $Q^2$ since the matrix $Q$ may not be positive semidefinite. See \cref{sec:techniques}.}  Unfortunately we only have access to an \textit{arbitrarily perturbed} version of $Q$, thus the main difficulty is to use this perturbed matrix $\tilde{Q}$ as a proxy for $Q$ and still obtain a matrix close to $\dyad{v}$.
In \cref{eq:scale-free-approximability}, the exponential term is a residue of the use of schatten norms.  One side, large $\gamma$ allows for stronger approximability inequalities, on the other hand the smaller $\gamma$ is the more resilient the algorithm is to adversarial perturbations (as seen in \cref{eq:scale-free-robustness}). On a similar note, small values $\zeta$ implies the data has low sensitivity and thus one can obtain better guarantees.

The strategy of \cref{alg:scale-free-recovery} to solve \cref{problem:scale-free-robust-recovery} will be  to obtain a matrix $Z$ with small entries that is close to the matrix maximizing the $t$-Schatten norm of  $\tilde{Q}^2$. By the sensitivity and perturbation constraints,  this will also be close to the matrix maximizing the $t$-Schatten norm of  $Q^2$.
The  approximability condition \cref{eq:scale-free-approximability} ensures that the constraints of \ref{eq:program-scale-free-main} on the entries of feasible solutions are not too strong and so that $\tilde{Q}^{2t}$ is close to an optimal solution.
On the other hand, the robustness condition ensures that $\tilde{Q}$ and $Q$ are not too far from each other and so the optimal solution to \ref{eq:program-scale-free-main} non-trivially correlates with ${Q}^{2}$ and thus also with $\dyad{v}$.

Weak recovery of $v$ then immediately follows as a corollary of the rounding below.
\begin{lemma}[Rounding lemma]\label{rounding_lemma}
    Let $\hat{\delta} >0$. Given a matrix $M\in\mathbb{R}^{n\times n}$ and a unit vector $v\in\mathbb{R}^n$ achieving correlation in the following sense
    \begin{align*}
        \iprod{M,vv^\top}\geq  \Normn{M}\cdot\delta^*\,,
    \end{align*}
    a uniformly at random choice $\hat{\mathbf{v}}$ among the top $\frac{2}{\delta^*}$ unit norm eigenvectors of $M$ can achieve $(\delta^*)^{O(1)}$-correlation  with $v$: $$\E\iprod{ \hat{\mathbf{v}},v}^2\geq\frac{(\delta^*)^{3}}{8}\,.$$
\end{lemma}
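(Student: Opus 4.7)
The plan is to expand $v$ in an eigenbasis of $M$ and apply a pigeonhole argument to the eigenvalue magnitudes. Write $M=\sum_{i=1}^n \lambda_i u_i\transpose{u_i}$ with $\{u_i\}$ orthonormal, ordered so that $|\lambda_1|\ge|\lambda_2|\ge\cdots\ge|\lambda_n|$, and set $\alpha_i:=\iprod{v,u_i}^2$. Since $v$ is a unit vector, $\sum_i\alpha_i=1$. Writing $N:=\Normn{M}=\sum_i|\lambda_i|$, the hypothesis becomes $\sum_i\lambda_i\alpha_i\ge \delta^*N$.

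The key step is pigeonhole. With $k:=\lceil 2/\delta^*\rceil$ and $T:=\{1,\dots,k\}$, the inequality $(k+1)\cdot|\lambda_{k+1}|\le |\lambda_1|+\cdots+|\lambda_{k+1}|\le N$ gives $|\lambda_{k+1}|\le N/(k+1)\le \delta^* N/2$, so the tail contribution satisfies
\begin{equation*}
\bigabs{\sum_{i\notin T}\lambda_i\alpha_i}\le \frac{\delta^* N}{2}\sum_{i\notin T}\alpha_i\le \frac{\delta^* N}{2}.
\end{equation*}
Combined with the hypothesis this gives $\sum_{i\in T}\lambda_i\alpha_i\ge \delta^* N/2$, and since $|\lambda_i|\le N$ for every $i$, dividing yields the core bound $\sum_{i\in T}\alpha_i\ge \delta^*/2$.

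Finally, averaging over a uniformly random choice $\hat{\mathbf{v}}$ among $\{u_i:i\in T\}$ gives
\begin{equation*}
\E\iprod{\hat{\mathbf{v}},v}^2=\frac{1}{k}\sum_{i\in T}\alpha_i\ge \frac{\delta^*}{2k}\ge\frac{(\delta^*)^2}{6}\ge\frac{(\delta^*)^3}{8},
\end{equation*}
where we used $k\le 2/\delta^*+1\le 3/\delta^*$ (valid since $\delta^*\le 1$, which follows from $\iprod{M,\dyad v}\le\Normn M$). The argument is elementary, so I do not expect a genuine obstacle; the only subtlety worth flagging is the meaning of ``top $2/\delta^*$ eigenvectors.'' The proof above takes ``top'' to mean largest by absolute value, but the same logic works if one instead takes the $k$ largest eigenvalues (with sign), because in that case one still has $\lambda_{k+1}\le|\lambda_{k+1}|\le \delta^* N/2$, so the tail bound goes through verbatim.
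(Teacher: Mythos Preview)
Your proof is correct and follows essentially the same approach as the paper: spectral decomposition, a pigeonhole bound showing eigenvalues beyond index $k\approx 2/\delta^*$ are at most $\delta^* N/2$, then averaging over the top $k$ eigenvectors. The only cosmetic differences are that the paper orders eigenvalues by value rather than absolute value (as you anticipated) and finishes via an existence-then-probability argument rather than your direct averaging of $\sum_{i\in T}\alpha_i$; your route is in fact slightly cleaner and yields the stronger intermediate bound $(\delta^*)^2/6$.
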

\begin{proof}
    By performing a spectral decomposition,
    \begin{align*}
       \frac{M}{\Normn{M}}=\sum_{i=[n]}\xi_i\cdot z_iz_i^\top \,,
    \end{align*}
    where $z_i$ are orthonormal eigenvectors of $M$ and $\xi_\ell\geq \xi_{\ell+1}$
	 for any $\ell \in [n]$ are scalars. By assumption 
	 $\delta^*=\sum_{i=[n]}\xi_i \iprod{z_i,v}^2$.
	 Since $\frac{M}{\Normn{M}}$ has unit nuclear norm, we have $\sum_{i=[n]} \Abs{\xi_i}=1$.
	 Thus for $k=\frac{2}{\delta^*}$, when $i>k$, $\xi_i\leq \frac{\delta^*}{2}$. 
	 It follows that $\sum_{i\in [k]} \xi_i\cdot  \iprod{z_i,v}^2\geq \delta^*-\frac{\delta^*}{2}=\frac{\delta^*}{2}$ 
	 and so there must be $\mathbf{i}\in [k]$ such that 
	 $\iprod{z_{\mathbf{i}},v}^2\geq \xi_i\cdot \iprod{z_i,v}^2 \geq \frac{(\delta^*)^2}{4}$.
    
    Hence, if we choose $\mathbf{i}\in [k]$ uniformly at random, then with probability at least $1/k$, we get a vector such that $ \iprod{z_\mathbf{i},v}^2\geq \frac{(\delta^*)^2}{4}$. As $k=\frac{2}{\delta^*}$,  we obtain the claim.     
\end{proof}

The rest of the section contains a proof of \cref{thm:scale-free-main}. We split it in two lemmas. The first lower bounds the optimal value of program \ref{eq:program-scale-free-main}, the second uses such lower bound to show that  any nearly optimal solution is non-trivially correlated with $\dyad{v}$. Together they will imply the theorem. 

\begin{lemma}[Lower bound for the Optimum]\label{lem:scale-free-optimum-lower-bound}
	Let $(Q,\tilde{Q},v)$ be an instance of \cref{problem:scale-free-robust-recovery} satisfying \ref{eq:properties-of-Q} with parameters $\alpha, \beta,\gamma, \delta^*, \zeta\geq 0, C^*>0$ and consider the program \ref{eq:program-scale-free-main}. Then
	\begin{align*}
		\optref{\ref{eq:program-scale-free-main}}\geq 1-e^{2C^*}\cdot\frac{\zeta^2}{\sqrt{\gamma}}-4\zeta\cdot \gamma^2\cdot \beta\,.
	\end{align*}
\end{lemma}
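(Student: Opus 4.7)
The plan is to exhibit a specific feasible $Z$ for \ref{eq:program-scale-free-main} whose objective is close to $1$, and then to account for the perturbation $Q\rightsquigarrow \tilde Q$ in the objective.

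The natural (unperturbed) optimizer is $Z^\star \seteq Q^{2(t-1)}$: it is PSD, satisfies $\Tr(Z^\star)^{t^*} = \Tr Q^{2t} = 1$ (since $(t-1)t^* = t$), and achieves $\iprod{Z^\star, Q^2} = 1$. The only constraint it may violate is the diagonal cap $Z_{ii}\le \gamma^2/n$. To fix this, let $b_i \seteq (Z^\star)_{ii}$ and $p_i \seteq \min\{1,\gamma/\sqrt{n b_i}\}$, and set $P \seteq \diag(p_i)$ and $Z \seteq PZ^\star P$. Then (i) $Z_{ii} = p_i^2 b_i \le \gamma^2/n$; (ii) $Z = (Q^{t-1}P)^\top(Q^{t-1}P) \sge 0$; and (iii) since $\norm{P}\le 1$, the singular values of $PZ^\star P$ are dominated by those of $Z^\star$, so $\Tr Z^{t^*}\le 1$. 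Hence $Z$ is feasible.

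To analyze $\iprod{Z, Q^2}$, I would write $D \seteq I-P$ and expand
\[
  \iprod{Z^\star - Z, Q^2} = 2\sum_i D_{ii}(Q^{2t})_{ii} - \Tr(DZ^\star D \cdot Q^2),
\]
using $Z^\star Q^2 = Q^{2t}$ and that $D$ is diagonal. The subtracted trace is non-negative because $DZ^\star D = (DQ^{t-1})(DQ^{t-1})^\top \sge 0$ and $Q^2 \sge 0$, so writing $S \seteq \{i : b_i > \gamma^2/n\}$ and $a_i \seteq (Q^{2t})_{ii}$, one obtains the clean upper bound $\iprod{Z^\star - Z, Q^2}\le 2\sum_{i\in S} a_i$.

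The main obstacle is extracting the claimed $1/\sqrt{\gamma}$ factor, and this is where both the row-sum and sensitivity conditions must be combined. First, a Cauchy--Schwarz on the rows of $Q$ shows $a_i = \|Q(Q^{t-1}e_i)\|^2\le \zeta^2 \|Q^{t-1}e_i\|^2 = \zeta^2 b_i$, using $\max_i\sum_j|Q_{ij}|\le \zeta$. Second, I would combine the $L^1$ estimate $\sum_i b_i = \Tr Q^{2t-2}\le n^{1/t} = e^{C^*}$ (H\"older on the eigenvalues of $Q^2$ together with $\Tr Q^{2t}=1$), which forces $|S|\le n e^{C^*}/\gamma^2$, with the sensitivity bound $\sum_i b_i^2\le \gamma e^{2C^*}/n$. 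Cauchy--Schwarz on $S$ then gives
\[
  \sum_{i\in S} b_i \le \sqrt{|S|\cdot \textstyle\sum_{i\in S}b_i^2}\le \sqrt{\tfrac{n e^{C^*}}{\gamma^2}\cdot \tfrac{\gamma e^{2C^*}}{n}} = \frac{e^{3C^*/2}}{\sqrt\gamma},
\]
whence $\iprod{Z^\star - Z, Q^2}\le 2\zeta^2 e^{3C^*/2}/\sqrt\gamma \le e^{2C^*}\zeta^2/\sqrt\gamma$.

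Finally, the passage from $Q^2$ to $\tilde Q^2$ is routine. Writing $\tilde Q^2 - Q^2 = (\tilde Q - Q)\tilde Q + Q(\tilde Q - Q)$ and applying $|\iprod{A,B}|\le \normi{A}\normo{B}$, the bounds $\normi{Z}\le \gamma^2/n$ (from $Z\sge 0$ plus the diagonal cap), together with the row-sum bounds on $Q,\tilde Q$, yield $\normi{\tilde Q Z}, \normi{ZQ}\le \zeta\gamma^2/n$. Combined with $\normo{\tilde Q-Q}\le n\beta$, this gives $|\iprod{Z,\tilde Q^2-Q^2}|\le 2\zeta\gamma^2\beta\le 4\zeta\gamma^2\beta$. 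Putting the two estimates together gives $\iprod{Z,\tilde Q^2}\ge 1 - e^{2C^*}\zeta^2/\sqrt\gamma - 4\zeta\gamma^2\beta$, which is the claimed lower bound on $\optref{\ref{eq:program-scale-free-main}}$.
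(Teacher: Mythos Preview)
Your argument is correct and structurally parallels the paper's, but differs in both the construction of the feasible $Z$ and the error analysis. The paper takes the hard coordinate projection $Z=(\Id-\Pi_S)Z^\star(\Id-\Pi_S)$, zeroing out all rows and columns indexed by $S=\{i:b_i>\gamma^2/n\}$; you instead use the soft diagonal scaling $Z=PZ^\star P$. Both satisfy the trace constraint because conjugation by a contraction cannot increase singular values (the paper proves this separately via Courant--Fischer, your one-line justification via $\sigma_k(PZ^\star P)\le\sigma_k(Z^\star)$ is equally valid). For the loss $\iprod{Z^\star-Z,Q^2}$, the paper uses the entrywise PSD bound $|Z^\star_{ij}|\le\sqrt{b_ib_j}$ together with Cauchy--Schwarz to arrive at $(\sum_{i\in S}b_i)^{1/2}\cdot\Norm{|Q^2|}\cdot(\Tr Z^\star)^{1/2}$; your route is arguably cleaner, exploiting the identity $Z^\star Q^2=Q^{2t}$ and the positivity of $DZ^\star D$ and $Q^2$ to reduce directly to $2\sum_{i\in S}(Q^{2t})_{ii}$, and then invoking the Schur-test bound $\|Q\|_{2\to2}\le\zeta$ (which is what your ``Cauchy--Schwarz on the rows'' really is) to get $2\zeta^2\sum_{i\in S}b_i$. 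Both proofs then control $\sum_{i\in S}b_i$ via the sensitivity condition and land on $e^{3C^*/2}\zeta^2/\sqrt\gamma$ up to constants. The perturbation step is essentially the paper's \cref{lem:scale-free-distance-between-matrices}.

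One cosmetic point: your final inequality $2e^{3C^*/2}\le e^{2C^*}$ needs $C^*\ge 2\ln 2$, so the lemma as stated (for all $C^*>0$) is not quite reached. This is harmless for the application ($C^*=400$), and easily repaired by either retaining the subtracted $\Tr(DZ^\star DQ^2)$ term in the expansion or simply weakening the claimed constant.
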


\begin{lemma}[Correlation of nearly-optimal solutions] \label{lem:scale-free-correlation}
	Let $(Q,\tilde{Q},v)$ be an instance of \cref{problem:scale-free-robust-recovery} satisfying \cref{eq:properties-of-Q} with parameters $\alpha, \beta,\gamma, \delta^*, \zeta\geq 0, C^*>0$ and consider the program \ref{eq:program-scale-free-main}. 
	Suppose that
	\begin{align*}
		e^{2C^*}\cdot\frac{\zeta^2}{\sqrt{\gamma}}+8\zeta\cdot \gamma^2\cdot \beta\leq \frac{\delta^*}{2}\,.
	\end{align*}
	Then any feasible solution $Z\in \R^{n\times n}$ such that $\iprod{Z, \tilde{Q}^2}\geq 1-e^{2C^*}\cdot\frac{\gamma^2}{\sqrt{\zeta}}-4\zeta\cdot \gamma^2\cdot \beta\,,$ satisfies
	\begin{align*}
		\iprod{Z\tilde{Q}+\tilde{Q}Z,v \transpose{v}}\geq \frac{\delta^*}{2\alpha}-\frac{\gamma^2\cdot \beta}{n}\,.
	\end{align*}
\end{lemma}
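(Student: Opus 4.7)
}
The plan is to exploit the decomposition $Q = \alpha vv^\top + R$ with $\Norm{R}_{2t}\leq 1-\delta^*$ (from the correlation condition) and the rank-one identity $(vv^\top)^2 = vv^\top$ to obtain an algebraic relation between $\iprod{Z,Q^2}$ and the desired quantity $v^\top(ZQ+QZ)v$. A near-optimal $Z$ forces $\iprod{Z,Q^2}$ to be large, while Hölder for Schatten norms makes $\iprod{Z,R^2}$ small; their gap turns into correlation with $vv^\top$.

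First, I would move from $\tilde Q$ to $Q$. Writing $Q^2-\tilde Q^2 = Q(Q-\tilde Q)+(Q-\tilde Q)\tilde Q$ and noting that feasibility forces $Z\sge 0$ and $Z_{ii}\leq \gamma^2/n$, hence $|Z_{ij}|\leq \sqrt{Z_{ii}Z_{jj}}\leq \gamma^2/n$, one expands term by term and applies the sensitivity bound $\max_i\sum_j|Q_{ij}|,\max_i\sum_j|\tilde Q_{ij}|\leq \zeta$ together with $\Normo{Q-\tilde Q}\leq n\beta$ to conclude
\[
|\iprod{Z,Q^2-\tilde Q^2}|\leq 2\zeta\gamma^2\beta\,.
\]
So the near-optimality assumption on $\iprod{Z,\tilde Q^2}$ transfers, up to $O(\zeta\gamma^2\beta)$ error, to $\iprod{Z,Q^2}$.

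Next, the key identity. Using $(vv^\top)^2=vv^\top$, expand $Q^2 = \alpha^2 vv^\top + \alpha(vv^\top R + Rvv^\top) + R^2$ and take inner product with $Z$; the crucial observation is that $\iprod{Z,vv^\top R + Rvv^\top}=v^\top(ZR+RZ)v$. Combined with $v^\top(ZQ+QZ)v = 2\alpha v^\top Zv + v^\top(ZR+RZ)v$, eliminating $v^\top(ZR+RZ)v$ yields
\[
v^\top(ZQ+QZ)v \;=\; \alpha\, v^\top Zv \;+\; \tfrac{1}{\alpha}\bigl(\iprod{Z,Q^2}-\iprod{Z,R^2}\bigr)\,.
\]
Since $Z\sge 0$, the first term is non-negative. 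Hölder for Schatten norms gives $\iprod{Z,R^2}\leq \Norm{Z}_{t^*}\Norm{R^2}_t = \Norm{R}_{2t}^2\leq (1-\delta^*)^2\leq 1-\delta^*$. Combined with $\iprod{Z,Q^2}\geq 1 - \eta_0 - 2\zeta\gamma^2\beta$ (where $\eta_0$ denotes the slackness in the hypothesis) and the lemma's quantitative assumption, the bracket is at least $\delta^*/2$, yielding $v^\top(ZQ+QZ)v\geq \delta^*/(2\alpha)$.

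Finally, translate back to $\tilde Q$ by bounding $|v^\top[Z(\tilde Q-Q)+(\tilde Q-Q)Z]v|$. Since $v$ is flat, $|v^\top A v|\leq \Normo{A}/n$; estimating $\Normo{Z(\tilde Q-Q)}$ via $\sum_j|Z_{kj}|\leq \gamma^2$ (from the uniform entry bound $|Z_{ij}|\leq \gamma^2/n$) and $\Normo{\tilde Q-Q}\leq n\beta$ produces the error term in the statement. The main obstacle is the algebraic identity in the middle step --- everything else is careful bookkeeping of the perturbation and sensitivity bounds. The use of $Z\sge 0$ is essential: it both produces the uniform entry bound on $Z$ and lets us discard $\alpha v^\top Zv\geq 0$ without losing sign.
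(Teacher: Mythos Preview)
Your proposal is correct and matches the paper's approach essentially step for step: the paper splits the argument into \cref{lem:scale-free-distance-between-matrices} (your transfer from $\tilde Q$ to $Q$), \cref{lem:scale-free-correlation-nearly-optimal-solution-Q} (your key identity plus the H\"older bound $\iprod{Z,R^2}\le\Norm{Z}_{t^*}\Norm{R}_{2t}^2$ and the observation $\alpha\,v^\top Zv\ge 0$), and \cref{lem:scale-free-correlation-nearly-optimal-solution-corrupted-Q} (your translation back to $\tilde Q$ via flatness of $v$). The only cosmetic differences are that you use the decomposition $Q^2-\tilde Q^2=Q(Q-\tilde Q)+(Q-\tilde Q)\tilde Q$ where the paper uses the symmetrized form, and you package the algebra as a single identity for $v^\top(ZQ+QZ)v$ rather than expanding $\iprod{Z,Q^2}$ directly---these yield the same inequalities up to harmless constants.
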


We are now ready to prove \cref{thm:scale-free-main}.
\begin{proof}[Proof of \cref{thm:scale-free-main}]
	By hypothesis, combining   \cref{lem:scale-free-optimum-lower-bound} and \cref{lem:scale-free-correlation} we get
	\begin{align*}
		\iprod{M, \dyad{v}}\geq \frac{\delta^*}{2\alpha}-\frac{\gamma^2\cdot \beta}{n}\,.
	\end{align*}
	It remains to bound $\normn{M}$. By Holder's inequality with respect to the spectral norm and the nuclear norm, we have $\normn{Z\tilde{Q}}\leq \Normn{Z}\norm{\tilde{Q}}$. Since $Z$ is a feasible solution to program \ref{eq:program-scale-free-main}, we have $\Norm{Z}_{t*}\leq 1$ for $t^{*}=\Paren{1-\frac{1}{t}}^{-1}$. Again by Holder's inequality,  $\Normn{Z}\leq \Norm{Z}_{t*}\Norm{\text{Id}_n}_{t}$ and for  $t=\frac{\log n}{C^*}$, $\Norm{\text{Id}_n}_{t}=O(1)$. Thus $\Normn{Z}=O(1)$. By the triangle inequality of trace norm, we have $\Normn{M}\leq \Normn{ZQ}+\Normn{QZ}\leq O(1)$. The result follows.
\end{proof}

The rest of the section will be devoted to  and. We prove \cref{lem:scale-free-optimum-lower-bound} in \cref{sec:scale-free-optimum-lower-bound} and  \cref{lem:scale-free-correlation} in \cref{sec:scale-free-correlation}. The central tool to both results will the following intermediate lemma, which, informally speaking, states that for any feasible solution $Z$ to program \ref{eq:program-scale-free-main} we have $\iprod{Z,Q^2}\approx \iprod{Z, \tilde{Q}^2}$.

\begin{lemma}\label{lem:scale-free-distance-between-matrices}
	Let $(Q,\tilde{Q},v)$ be an instance of \cref{problem:scale-free-robust-recovery} satisfying \cref{eq:properties-of-Q} with parameters $\alpha, \beta,\gamma, \delta^*, \zeta\geq 0, C^* >0$.
	Let $Z\in \R^{n\times n}$ be a feasible solution to the program \ref{eq:program-scale-free-main}. Then
	\begin{align*}
		\Abs{\iprod{Z, Q^2-\tilde{Q}^2}}\leq 4 \zeta\cdot \gamma^2\cdot \beta\,.
	\end{align*}
	\begin{proof}
		We may rewrite
		\begin{align*}
			Q^2-\tilde{Q}^2 =& \frac{1}{2}\Paren{Q-\tilde{Q}}\Paren{Q+\tilde{Q}} +\frac{1}{2}\Paren{Q+\tilde{Q}}\Paren{Q-\tilde{Q}}\,.
		\end{align*}
		So let $Z':=Z\cdot \Paren{\tilde{Q}+Q}$ and $Z'':=\Paren{\tilde{Q}+Q}Z$, then we have
		\begin{align*}
			\Abs{\iprod{Z, Q^2-\tilde{Q}^2}} &= \Abs{\iprod{Z', Q-\tilde{Q}} + \iprod{Z'', Q-\tilde{Q}}}\\
			&\leq \Paren{\Normi{Z'}+\Normi{Z''}} \Normo{ Q-\tilde{Q}}\\
			&\leq \Paren{\Normi{Z'}+\Normi{Z''}}\cdot \beta n\,.
		\end{align*}
		Since by the perturbation conditions the matrix $Q-\tilde{Q}$ has bounded $\ell_1$ norm, it suffices to upper bound $\Normi{Z'}$ and $\Normi{Z''}$.
		So notice that for any $i,j \in [n]$
		\begin{align*}
			\Abs{Z'_\ij} &= \Abs{\iprod{Z_i, \Paren{Q+\tilde{Q}}_j}}\\
			&\leq \frac{\gamma^2}{n} \underset{\ell \in [n]}{\sum} \Abs{Q_{j\ell}-\tilde{Q}_{j\ell}}\\
			&=  2\zeta\frac{\gamma^2}{n}\,.
		\end{align*}
		Applying the same reasoning to $\Abs{Z''_\ij}$, the result follows.
	\end{proof}
\end{lemma}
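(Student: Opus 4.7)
The plan is to rewrite $Q^2-\tilde{Q}^2$ so that each resulting term contains a single factor of the perturbation $Q-\tilde{Q}$, after which the budget $\normo{Q-\tilde{Q}}\le\beta n$ can be applied directly. Concretely, I would use the identity
\[
Q^2-\tilde{Q}^2=\tfrac{1}{2}(Q-\tilde{Q})(Q+\tilde{Q})+\tfrac{1}{2}(Q+\tilde{Q})(Q-\tilde{Q}),
\]
and then, using the cyclic symmetry of the trace-inner-product $\iprod{\cdot,\cdot}$, rewrite $\iprod{Z,Q^2-\tilde{Q}^2}$ as a sum of two terms of the form $\iprod{A,Q-\tilde{Q}}$ with $A\in\{Z(Q+\tilde{Q}),(Q+\tilde{Q})Z\}$.

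I would then invoke the elementary H\"older-type bound $\bigabs{\iprod{A,B}}\le\normi{A}\cdot\normo{B}$ (with the max- and sum-norms introduced in \cref{sec:preliminaries}), so that the task reduces to bounding $\normi{A}$ for the two matrices above. The key ingredient is that, because $Z\succeq 0$, the PSD Cauchy--Schwarz inequality $\abs{Z_{ij}}\le\sqrt{Z_{ii}Z_{jj}}$ together with the diagonal constraint $Z_{ii}\le\gamma^2/n$ upgrades the diagonal bound to an entrywise bound $\normi{Z}\le\gamma^2/n$. An entrywise expansion of $Z(Q+\tilde{Q})$ then combines this with the sensitivity and perturbation conditions $\max_i\sum_j\abs{Q_{ij}}\le\zeta$ and $\max_i\sum_j\abs{\tilde{Q}_{ij}}\le\zeta$ (using symmetry of $Q$ and $\tilde{Q}$ to convert row sums into column sums) to yield $\normi{Z(Q+\tilde{Q})}\le 2\zeta\gamma^2/n$, and the same for $(Q+\tilde{Q})Z$. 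Putting everything together and using the triangle inequality for the two terms in the decomposition gives the claimed bound $4\zeta\gamma^2\beta$.

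The main conceptual move is the symmetric decomposition of the difference of squares that isolates a single $Q-\tilde{Q}$ factor; once this is done, the rest is a routine H\"older estimate. The one technically delicate step is the extraction of the entrywise bound on $Z$ from the diagonal constraint: this crucially relies on positive semidefiniteness and is not an immediate consequence of the program constraints on their own. Nothing beyond this — no spectral or combinatorial machinery — is needed, which is precisely why this lemma is the "black box" that propagates the perturbation budget through to the downstream correlation argument.
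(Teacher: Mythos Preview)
Your proposal is correct and follows the same approach as the paper's proof: the same symmetric decomposition of $Q^2-\tilde{Q}^2$, the same H\"older step $\bigabs{\iprod{A,B}}\le\normi{A}\normo{B}$, and the same entrywise bound $\normi{Z(Q+\tilde{Q})}\le 2\zeta\gamma^2/n$. The one point you make explicit that the paper glosses over is the PSD Cauchy--Schwarz step $\abs{Z_{ij}}\le\sqrt{Z_{ii}Z_{jj}}\le\gamma^2/n$; the paper's proof uses this bound silently.
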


\subsection{Lower bound for the optimum}\label{sec:scale-free-optimum-lower-bound}

Here we prove \cref{lem:scale-free-optimum-lower-bound}. Throughout the section we assume $(Q,\tilde{Q},v)$ to be an instance of \cref{problem:scale-free-robust-recovery} satisfying \cref{eq:properties-of-Q} with parameters $\alpha, \beta,\gamma, \delta^*, \zeta\geq 0, C^*>0$. We also define $t^*=(1-1/t)^{-1}$, where $t=\frac{\log n}{C^*}$.
We will build our solution starting from a feasible solution to a simpler program than \ref{eq:program-scale-free-main} and then modifying it to satisfy the missing constraints.
To this end consider the semidefinite program:
\begin{equation}\label{eq:program-large-entries}\tag{P.2}
	\begin{array}{ll@{}ll}
		&\text{maximize }  & \iprod{Z, Q^2} \qquad 
		\text{subject to}& \left\{
		\begin{aligned}
			\Tr Z^{t^*} \leq 1&\\
			Z \sge 0&
		\end{aligned}
		\right\}
	\end{array}
\end{equation} 

It is easy to compute its optimum.
\begin{lemma}\label{lem:scale-free-optmium-large-entries}
	Consider the program \ref{eq:program-large-entries}. Then $\opt_{\text{\ref{eq:program-large-entries}}}=1$. 
	\begin{proof}
		Applying \Holder inequality
		\begin{align*}
			\iprod{Z,Q^2}\leq \Norm{Z}_{t^*}\cdot \Norm{Q^2}_{t}\leq 1\,.
		\end{align*}
		To see that  $\opt_{\text{\ref{eq:program-large-entries}}}\geq 1$ instead let $Z=Q^{2t-2}$. 
		Here $Z$ is a feasible solution because $\Tr Z^{t^*}=\Tr Q^{(2t-2)\cdot t^*} = \Tr Q^{2t}=1$ and it is clearly positive semidefinite.
		We also have
		\begin{align*}
			\iprod{Z,Q^2} = \Tr Q^{2t} = 1\,.
		\end{align*}
	\end{proof}
\end{lemma}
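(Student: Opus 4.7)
The plan is to prove both directions separately: an upper bound of $1$ on the optimum via a Hölder-type inequality for Schatten norms, and a matching lower bound via an explicit feasible witness.

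For the upper bound, I would apply Hölder's inequality for Schatten norms to the inner product $\iprod{Z, Q^2} = \Tr(Z Q^2)$. The exponent $t^* = (1-1/t)^{-1} = t/(t-1)$ is precisely the Hölder conjugate of $t$, since $1/t^* + 1/t = 1$. This gives $\iprod{Z,Q^2} \leq \Norm{Z}_{t^*} \cdot \Norm{Q^2}_t$. The first factor is bounded by $1$ directly from the feasibility constraint $\Tr Z^{t^*} \leq 1$, and the second factor equals $(\Tr Q^{2t})^{1/t} = 1$ by the correlation condition from \ref{eq:properties-of-Q}. Hence $\optref{\ref{eq:program-large-entries}} \leq 1$.

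For the lower bound, the natural witness is $Z = Q^{2t-2}$, motivated by the fact that equality in Hölder is achieved when $Z^{t^*}$ is proportional to $(Q^2)^t$, and here indeed $Q^{(2t-2) t^*} = Q^{2t}$ since $(2t-2)\cdot t/(t-1) = 2t$. This identity simultaneously witnesses feasibility and optimality. Feasibility is immediate: $Q^{2t-2} = (Q^2)^{t-1}$ is positive semidefinite since $Q^2$ is, and $\Tr Z^{t^*} = \Tr Q^{2t} = 1$. For the objective, $\iprod{Z, Q^2} = \Tr(Q^{2t-2} Q^2) = \Tr Q^{2t} = 1$, matching the upper bound.

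I do not expect any real obstacle here; the argument is essentially bookkeeping of Schatten-norm exponents, and the choice $t^* = t/(t-1)$ in the program has been set up so that both directions close cleanly. The only minor point worth double-checking is that $Q$ (arising from the polynomial $\Q$) is symmetric, so that $Q^{2t-2}$ is genuinely PSD; this follows from the symmetric definition of $\super{Q}{s}_{ij}$ via self-avoiding walks, but in any case the evenness of the exponent $2t-2$ ensures that $(Q^2)^{t-1} \succeq 0$ regardless.
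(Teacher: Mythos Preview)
Your proposal is correct and follows essentially the same approach as the paper: H\"older's inequality for the upper bound, and the explicit witness $Z = Q^{2t-2}$ (using the identity $(2t-2)t^* = 2t$) for the matching lower bound. Your added remark that $(Q^2)^{t-1}\succeq 0$ guarantees positive semidefiniteness is a fine extra sanity check.
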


We can use \cref{lem:scale-free-optmium-large-entries} to show that, for $Q=\tilde{Q}$ (that is,  $\beta=0$), we can obtain a nearly solution to program  \ref{eq:program-scale-free-main} from a (specific) optimal solution to program \ref{eq:program-large-entries}. Solutions of program \ref{eq:program-large-entries} may have large entries and so are not feasible solutions to program  \ref{eq:program-scale-free-main} in general. Thus, the plan is to project the optimal solution $Q^{2t-2}$ for program \ref{eq:program-large-entries} to the space spanned by its columns  with bounded entries. Since entries of $Q$ are also bounded, this projection will not decrease significantly the correlation between our new matrix and $Q$. 
Consider the program:

\begin{equation}\label{eq:program-small-entries}\tag{P.3}
	\begin{array}{ll@{}ll}
		&\text{maximize }  & \iprod{Z, Q^2} \qquad 
		\text{subject to}& \left\{
		\begin{aligned}
			\Tr Z^{t^*} \leq 1&\\
			Z \sge 0&\\
			\forall i \in [n], \quad Z_{ii}\leq \frac{\gamma^2}{n}&
		\end{aligned}
		\right\}
	\end{array}
\end{equation} 

Let $Z^*=Q^{2t-2}$ and let $S$ be the set of large diagonal entries of $Z^*$, that is $S:=\Set{i \suchthat Z^*_{ii}>\frac{\gamma^2}{n}}$.
Denote by $\Pi_S$ the projector into the subspace spanned by the columns of $Z^*$ with index in $S$.
We first observe a simple fact concerning $Z^*, \Pi_S$.

\begin{fact}\label{thm:projectionReduceTrace}
	Consider the program \ref{eq:program-small-entries} and let
	\begin{align*}
		Z:= \Paren{\Id-\Pi_S} Z^* \Paren{\Id-\Pi_S}\,.
	\end{align*}
    For any integer $t>0$ we have
    \begin{equation*}
        \Tr \Paren{Z^{*t}} \geq \Tr \Paren{Z^t}
    \end{equation*}
\end{fact}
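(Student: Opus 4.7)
The plan is to reduce the claim to a L\"owner-order inequality of the form $BPB\preceq B^2$, where $B=(Z^*)^{1/2}$ is the positive semidefinite square root of $Z^*=Q^{2t-2}$ and $P=\Id-\Pi_S$. The core observation is that $P$ is an orthogonal projector (so $P=P^T=P^2$ and $P\preceq\Id$), hence conjugation by $B\succeq 0$ preserves the L\"owner order; combined with the standard identity $\Tr((M^TM)^t)=\Tr((MM^T)^t)$, this reduces the statement to trace-power monotonicity in the PSD cone.

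First I would unwind $Z^t=(PZ^*P)^t$. Writing $Z^*=B\cdot B$ and using $P^T=P$, we have $PZ^*P=(BP)^T(BP)$. For any matrix $M$ the nonzero singular values of $M$ and $M^T$ coincide, so for every integer $t\geq 1$ we have $\Tr((M^TM)^t)=\Tr((MM^T)^t)$ (this is also immediate from cyclicity of the trace applied to the product $M^TM\cdot M^TM\cdots M^TM$). Applied with $M=BP$, together with $P^2=P$, this yields
\begin{equation*}
\Tr(Z^t)=\Tr\bigl((PZ^*P)^t\bigr)=\Tr\bigl((BPB)^t\bigr).
\end{equation*}

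The conclusion then follows from L\"owner monotonicity of the trace-power. From $P\preceq\Id$ and $B\succeq 0$, conjugation gives $BPB\preceq B\cdot\Id\cdot B=B^2=Z^*$, and both matrices are positive semidefinite. The Courant--Fischer min-max principle then implies $\lambda_k(BPB)\leq\lambda_k(Z^*)$ for every $k$; raising to the $t$-th power preserves the inequality since all eigenvalues are nonnegative, and summing over $k$ yields $\Tr((BPB)^t)\leq\Tr(Z^{*t})$. Combined with the previous display, this is exactly $\Tr(Z^t)\leq\Tr(Z^{*t})$.

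The main obstacle is minor: the only step that takes any thought is the singular-value identity used above, which is a textbook fact; everything else is a direct application of L\"owner monotonicity in the PSD cone. I expect the formal write-up to be only a few lines. Note that the argument uses nothing special about $\Pi_S$ being the projector onto columns with large diagonal; it works for any orthogonal projector, which is convenient because the rest of the proof in \cref{sec:scale-free-optimum-lower-bound} only needs the diagonal reduction separately.
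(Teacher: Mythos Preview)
Your proof is correct and takes a somewhat different route from the paper. The paper argues directly that the eigenvalues of $Z$ are dominated by those of $Z^*$: it notes that $Z$ has at least $k=\dim(\mathrm{range}\,\Pi_S)$ zero eigenvalues, then for each $i$ plugs the span of the top $n-k-i+1$ eigenvectors of $Z$ into the Courant--Fischer max--min formula for $\lambda_{k+i}(Z^*)$, using that $R_{Z^*}=R_Z$ on that subspace (since those eigenvectors lie in $\mathrm{range}(\Id-\Pi_S)$). Your argument instead passes through the square root $B=(Z^*)^{1/2}$ and the identity $\Tr\bigl((M^{\mathsf T}M)^t\bigr)=\Tr\bigl((MM^{\mathsf T})^t\bigr)$ to reduce to comparing $BPB$ with $Z^*$; the L\"owner inequality $BPB\preceq B^2=Z^*$ is then a one-line consequence of $P\preceq\Id$, and trace-power monotonicity finishes. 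Your route is cleaner---it avoids having to identify the right test subspace in Courant--Fischer and sidesteps the minor case analysis when some of the top eigenvectors of $Z$ correspond to zero eigenvalues---and, as you note, it makes transparent that only the projector property of $\Pi_S$ is used.
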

\begin{proof}
	We denote the dimension of space spanned by $S$ as $k$. Then
	 we denote $i$-th smallest eigenvalue of $Z^*$ as $\lambda_i(Z^{*})$ and the $i$-th smallest eigenvalue of the matrix $Z$ as $\lambda_i(Z)$.
	 We note that $\Tr Z^{t}=\sum_{i=1}^n \lambda_i(Z)^t$, and $\Tr Z^{*t}=\sum_{i=1}^n \lambda_i(Z^*)^t$. Furthermore, since $Z^*$ is positive semidefinite, $Z$ is also positive semidefinite and the smallest $k$ eigenvalues of $Z$ are given by $0$. Thus, it's sufficient to prove that $\lambda_{k+i}(Z^*)\geq \lambda_{k+i}(Z)$ for any $1\leq i\leq n-k$. 
	 
	 To prove this, we denote the space spanned by the top $n-k-i+1$ eigenvectors of $Z$ as $U_i$. Applying  Courant-Fischer min-max  \cref{thm:courant-min-max-theorem}, we have
	 \begin{align*}
		 \lambda_{k+i}(Z^*) & \geq \min_x \{\mathbb{R}_{Z^*}(x)\mid{x\in U_i \text{ and } x\neq 0}\}\\
		  & = \min_x\{\mathbb{R}_{Z}(x)\mid x\in U_i \text{ and } x\neq 0\}\\
		  & =\lambda_{i+k}(Z)
	 \end{align*}
	 This completes the proof.
\end{proof}

Using this fact, we prove that $Z$ is close to an optimal solution for $\ref{eq:program-small-entries}$.
\begin{lemma}\label{lem:scale-free-optimum-lower-bound-matrix-Q}
	Consider the program \ref{eq:program-small-entries} and let
	\begin{align*}
		Z:= \Paren{\Id-\Pi_S} Z^* \Paren{\Id-\Pi_S}\,.
	\end{align*}
	Then
	\begin{align*}
		\iprod{Z, Q^2}\geq 1 - e^{2C^*}\cdot\frac{\zeta^2}{\sqrt{\gamma}}\,.
	\end{align*}
	\begin{proof}
		By construction $Z$ is the projection of $Z^*$ into the space orthogonal to the subspace spanned by  columns of $Z^*$ with index in $S$.
		The matrix $Z$ is clearly positive semidefinite. Moreover, as shown in \cref{thm:projectionReduceTrace}, since it is a projection fo $Z^*$ to a subspace, the eigenvalues of $Z^*$ dominates the eigenvalues of $Z$ and thus $\Tr Z^{t^*}\leq \Tr Z^{*t^*}$. It follows that $Z$ is a feasible solution to \cref{eq:program-small-entries}.
		
		It remains to lower bound its objective value.
		Now by \cref{lem:scale-free-optmium-large-entries},
		\begin{align*}
			\iprod{Z, Q^2} \geq 1 - \Abs{\iprod{Z-Z^*, Q^2}}\,,
		\end{align*}
		hence it suffices to bound $\Abs{\iprod{Z-Z^*, Q^2}}$. Then
		\begin{align*}
			\Abs{\iprod{Z-Z^*,Q^2}}\leq& \underset{i \in S}{\sum}\underset{j \in [n]}{\sum}\Abs{Z^*_\ij}\cdot \Abs{\Paren{Q^2}_\ij}\\
			\leq & \underset{i \in S}{\sum}\underset{j \in [n]}{\sum}\sqrt{Z^*_{ii}\cdot Z^*_{jj}}\cdot \Abs{\Paren{Q^2}_\ij}\\
			\leq &  \Paren{\underset{i \in S}{\sum} Z^*_{ii}}^{1/2}\cdot \Paren{\underset{i \in S}{\sum} \Paren{\underset{j \in [n]}{\sum}\sqrt{Z^*_{jj}}\Abs{\Paren{Q^2}_{\ij}}}^2}^{1/2}\\
			\leq & \Paren{\underset{i \in S}{\sum} Z^*_{ii}}^{1/2}\cdot \Paren{\underset{i \in [n]}{\sum} \Paren{\underset{j \in [n]}{\sum}\sqrt{Z^*_{jj}}\Abs{\Paren{Q^2}_{\ij}}}^2}^{1/2}\\
			\leq & \Paren{\underset{i \in S}{\sum} Z^*_{ii}}^{1/2}\cdot\Norm{\abs{Q^2}}\cdot \Paren{\Tr Z^*}^{1/2}\,.
		\end{align*}
		By the second sensitivity condition on $Q$, $\Norm{\Abs{Q}}\leq \zeta$ and thus $\Norm{\abs{Q^2}}\leq \zeta^2$. 
		By the first sensitivity condition,
		\begin{align*}
			\underset{i \in S}{\sum} Z^*_{ii} \leq \frac{n}{\gamma^2} \underset{i \in [n]}{\sum} \Paren{Z^*_{ii}}^2\leq \frac{n}{\gamma^2}\cdot \frac{\gamma}{n}\Paren{\underset{i \in [n]}{\sum} Z^*_{ii}}^2\leq \frac{1}{\gamma} \Paren{\Tr Z^*}^2\,.
		\end{align*}
		By \cref{lem:relationship-between-norms} and choice of $t=\frac{1}{C^*}\cdot \log n$
		\begin{align*}
			\Tr Z^* \leq e^{C^*}\cdot \Tr Z^{*t^*}\,.
		\end{align*}
		It follows that
		\begin{align*}
			\Abs{\iprod{Z-Z^*,Q^2}}\leq e^{\frac{3}{2}C^*}\cdot \frac{\zeta^2}{\sqrt{\gamma}}\,,
		\end{align*}
		concluding the proof.
	\end{proof}
\end{lemma}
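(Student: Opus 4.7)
The plan is to start from the natural unconstrained maximizer $Z^* = Q^{2t-2}$ of the easier program~\ref{eq:program-large-entries}, which already realizes $\iprod{Z^*, Q^2} = \Tr Q^{2t} = 1$ but may violate the new diagonal constraint on the ``heavy'' set $S = \set{i : Z^*_{ii} > \gamma^2/n}$. The proposed repair is the two-sided congruence $Z = (\Id - \Pi_S)\, Z^*\, (\Id - \Pi_S)$, which zeroes out all rows and columns indexed by $S$. Feasibility of $Z$ is essentially automatic: $Z$ is positive semidefinite as a congruence of a PSD matrix, its diagonal vanishes on $S$ and is at most $Z^*_{ii} \le \gamma^2/n$ on $S^c$ by the definition of $S$, and $\Tr Z^{t^*} \le \Tr (Z^*)^{t^*} = \Tr Q^{2t} = 1$ by the eigenvalue-interlacing statement already proved in \cref{thm:projectionReduceTrace}.

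The bulk of the argument is to show $\bigabs{\iprod{Z^* - Z, Q^2}} \le e^{2C^*}\zeta^2/\sqrt{\gamma}$. Since $Z^* - Z$ is supported on entries $(i,j)$ with $i \in S$ or $j \in S$, by symmetry it suffices to control $\sum_{i \in S}\sum_{j \in [n]} \abs{Z^*_{ij}}\cdot \abs{(Q^2)_{ij}}$. Using the PSD inequality $\abs{Z^*_{ij}} \le \sqrt{Z^*_{ii} Z^*_{jj}}$ together with Cauchy--Schwarz on the $i$-sum factors this into $\bigl(\sum_{i \in S} Z^*_{ii}\bigr)^{1/2}\cdot \normop{\abs{Q^2}} \cdot (\Tr Z^*)^{1/2}$, where the middle factor is bounded by $\zeta^2$ thanks to the second sensitivity hypothesis $\max_i \sum_j \abs{Q_{ij}} \le \zeta$.

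The two remaining trace-type quantities are controlled as follows. For the first, the defining inequality $Z^*_{ii} > \gamma^2/n$ on $S$ gives $Z^*_{ii} \le (n/\gamma^2)(Z^*_{ii})^2$, so summing and invoking the first sensitivity hypothesis $\sum_i (Z^*_{ii})^2 \le (\gamma/n)(\Tr Z^*)^2$ yields $\sum_{i \in S} Z^*_{ii} \le (\Tr Z^*)^2/\gamma$. For the second, comparing $\Tr Z^*$ (the Schatten-$1$ norm of $Z^*$) with the normalization $\Tr Z^{*t^*} = 1$ via the standard Schatten-norm inequality on $n$-dimensional matrices costs a factor $n^{1/t} = e^{C^*}$, which I would invoke via \cref{lem:relationship-between-norms}. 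Multiplying the three pieces produces a bound of the shape $(\Tr Z^*)^{3/2}\cdot \zeta^2/\sqrt{\gamma} \le e^{O(C^*)}\cdot \zeta^2/\sqrt{\gamma}$.

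The main obstacle I foresee is bookkeeping the $e^{C^*}$ factors from the Schatten-norm comparison so that they accumulate to at most $e^{2C^*}$ in the final bound; since the product involves $(\Tr Z^*)^{3/2}$, the naive estimate already gives $e^{3C^*/2}$, so there is some slack. Everything else is routine linear algebra once the two-sided projection and the PSD inequality $\abs{Z^*_{ij}} \le \sqrt{Z^*_{ii} Z^*_{jj}}$ are in place.
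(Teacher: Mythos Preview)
Your proposal is correct and follows essentially the same route as the paper: construct $Z$ by zeroing out the heavy rows/columns, verify feasibility via \cref{thm:projectionReduceTrace}, then bound $\abs{\iprod{Z^*-Z,Q^2}}$ using the PSD inequality $\abs{Z^*_{ij}}\le\sqrt{Z^*_{ii}Z^*_{jj}}$, Cauchy--Schwarz, the row-sum bound $\normop{\abs{Q^2}}\le\zeta^2$, the sensitivity estimate $\sum_{i\in S}Z^*_{ii}\le(\Tr Z^*)^2/\gamma$, and the Schatten comparison $\Tr Z^*\le e^{C^*}$. Your observation that the symmetry of $Z^*$ and $Q^2$ is needed to reduce to the one-sided sum $\sum_{i\in S}\sum_{j\in[n]}$ is a point the paper glosses over (it costs a harmless factor of~$2$), and your accounting of the $e^{C^*}$ factors matches the paper's actual bound of $e^{3C^*/2}$, comfortably inside the stated $e^{2C^*}$.
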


Using \cref{lem:scale-free-optimum-lower-bound-matrix-Q}, we can prove \cref{lem:scale-free-optimum-lower-bound}. The idea is that since by  perturbation conditions $Q$ and $\tilde{Q}$ are close in $L_1$-norm, the product $\iprod{Z,\tilde{Q}^2}$ will not be too far from $\iprod{Z,Q^2}$.

\begin{proof}[Proof of \cref{lem:scale-free-optimum-lower-bound}]
	Any feasible solution for the program \ref{eq:program-small-entries} is a feasible solution for \ref{eq:program-scale-free-main}. Choosing $Z\in \R^{n\times n}$ as constructed in \cref{lem:scale-free-optimum-lower-bound-matrix-Q}
	we get
	\begin{align*}
		\iprod{Z, \tilde{Q}^2}&\geq \iprod{Z, Q^2}-\iprod{Z, \tilde{Q}^2-Q^2}\\
		&\geq 1-e^{2C^*}\cdot\frac{\zeta^2}{\sqrt{\gamma}}-4\zeta\cdot \gamma^2\cdot \beta\,,
	\end{align*}
	where we applied both \cref{lem:scale-free-distance-between-matrices} and \cref{lem:scale-free-optimum-lower-bound-matrix-Q}.
\end{proof}

\subsection{Correlation of nearly-optimal solutions}\label{sec:scale-free-correlation}
We prove here \cref{lem:scale-free-correlation}. Again, we  assume $(Q,\tilde{Q},v)$ to be an instance of \cref{problem:scale-free-robust-recovery} satisfying \cref{eq:properties-of-Q} with parameters $\alpha, \beta,\gamma, \delta^*, \zeta\geq 0, C^*>0$ and we let $t^*=(1-1/t)^{-1}$, where $t=\frac{\log n}{C^*}$.
We start  by showing that any nearly optimal solution to program \ref{eq:program-small-entries} is non trivially correlated with the vector $v$ in the following sense.

\begin{lemma}\label{lem:scale-free-correlation-nearly-optimal-solution-Q}
	Consider the program \ref{eq:program-small-entries}. Any feasible solution $Z\in \R^{n\times n}$ such that $\iprod{Z,Q^2}\geq 1-\frac{\delta^*}{2}$ satisfies 
	\begin{align*}
		\iprod{ZQ+QZ, v \transpose{v}}\geq \frac{\delta^*}{2\alpha}\,.
	\end{align*}
	\begin{proof}
		Let $Z$ be a feasible solution such that $\iprod{Z,Q^2}\geq 1-\frac{\delta^*}{2}$. Then
		\begin{align*}
			1-\frac{\delta^*}{2}&\leq \iprod{Z, Q^2}\\
			&= \iprod{Z, \Brac{\Paren{Q-\alpha \cdot v\transpose{v}} +  \alpha \cdot v\transpose{v} }^2}\\
			&= \iprod{Z, \Paren{Q-\alpha \cdot v\transpose{v}}} + \alpha \iprod{Z, \Paren{Q-\alpha \cdot v\transpose{v}}v\transpose{v} +v\transpose{v}\Paren{Q-\alpha \cdot v\transpose{v}}} + \alpha^2\iprod{Z, v\transpose{v}}\,.
		\end{align*}
		Now since $Q$ satisfies the correlation conditions in \cref{eq:properties-of-Q}
		\begin{align*}
			1-\frac{\delta^*}{2} &\leq 1-\delta^* + \alpha \iprod{Z, \Paren{Q-\alpha \cdot v\transpose{v}}v\transpose{v} +v\transpose{v}\Paren{Q-\alpha \cdot v\transpose{v}}} + \alpha^2\iprod{Z, v\transpose{v}}\\
			&\leq 1-\delta^* +\alpha\iprod{ZQ+QZ, v \transpose{v}} -\alpha^2\iprod{Z,v\transpose{v}}\\
			&\leq 1-\delta^* +\alpha\iprod{ZQ+QZ, v \transpose{v}}\,.
		\end{align*}
		Rearranging the result follows.
	\end{proof}
\end{lemma}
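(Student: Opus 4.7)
The plan is to exploit the algebraic identity $Q = (Q - \alpha \dyad{v}) + \alpha \dyad{v}$ to decompose $\iprod{Z, Q^2}$ and then match it against the hypothesis. Expanding the square, using that $v$ is a unit vector so $(\dyad{v})^2 = \dyad{v}$, and applying the cyclic trace identity $\iprod{Z, Q\dyad{v} + \dyad{v} Q} = \iprod{ZQ + QZ, \dyad{v}}$, I obtain
\begin{align*}
\iprod{Z, Q^2} = \iprod{Z, (Q - \alpha \dyad{v})^2} + \alpha \iprod{ZQ + QZ, \dyad{v}} - \alpha^2 \iprod{Z, \dyad{v}}.
\end{align*}
The strategy is then to bound the first and third summands on the right-hand side in order to lower bound the middle one, which is what we want.

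For the first summand, I would apply \Holder's inequality for Schatten norms with conjugate exponents $(t^*, t)$, where $t^* = (1-1/t)^{-1}$. Feasibility gives $\Norm{Z}_{t^*} \le 1$, while the correlation condition $\Tr(Q - \alpha \dyad{v})^{2t} \le (1-\delta^*)^{2t}$ from \cref{eq:properties-of-Q}, combined with the identity $\Norm{(Q - \alpha \dyad{v})^2}_t = \Norm{Q - \alpha \dyad{v}}_{2t}^2$ for a symmetric matrix, yields $\iprod{Z, (Q - \alpha \dyad{v})^2} \le (1-\delta^*)^2 \le 1 - \delta^*$. For the third summand, the constraint $Z \succeq 0$ implies $\iprod{Z, \dyad{v}} = \transpose{v} Z v \ge 0$, so $-\alpha^2 \iprod{Z, \dyad{v}} \le 0$. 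Plugging both estimates into the identity above and using the hypothesis $\iprod{Z, Q^2} \ge 1 - \delta^*/2$ gives
\begin{align*}
1 - \tfrac{\delta^*}{2} \;\le\; (1 - \delta^*) + \alpha \iprod{ZQ + QZ, \dyad{v}},
\end{align*}
from which the desired inequality $\iprod{ZQ + QZ, \dyad{v}} \ge \delta^*/(2\alpha)$ follows by rearranging.

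The main (rather minor) subtlety I anticipate is justifying the Schatten-norm identity $\Norm{M^2}_t = \Norm{M}_{2t}^2$ used in the \Holder step. This holds whenever $M = Q - \alpha \dyad{v}$ is symmetric, since then the singular values of $M^2$ are precisely the squares of the singular values of $M$, so $(\Tr(M^2)^t)^{1/t} = (\Tr M^{2t})^{1/t}$. In the applications of interest $Q$ is produced from a symmetric matrix-valued polynomial and so this hypothesis is automatic. Beyond this, the argument is purely algebraic---just the expansion of $(A+B)^2$ and one use of \Holder---and I expect no further obstacles.
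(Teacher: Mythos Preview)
Your proof is correct and follows essentially the same approach as the paper: both decompose $Q = (Q-\alpha\dyad{v}) + \alpha\dyad{v}$, expand the square, bound $\iprod{Z,(Q-\alpha\dyad{v})^2}$ by $1-\delta^*$ via the correlation condition, and drop $-\alpha^2\iprod{Z,\dyad{v}}\le 0$ using $Z\succeq 0$. You are in fact more explicit than the paper about the \Holder step and the Schatten-norm identity $\Norm{M^2}_t=\Norm{M}_{2t}^2$ (which the paper applies silently, and where the paper's displayed expansion contains a typo---a missing square on the first term).
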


While \cref{lem:scale-free-correlation-nearly-optimal-solution-Q} shows how $Z$ contains non-trivial information about $v$, from an algorithmic point of view the result is of little use since we do not have access to $Q$.
However, as $Q$ and $\tilde{Q}$ are close, \textit{also} the matrix $Z\tilde{Q}+\tilde{Q}Z$ will be correlated with $\dyad v$.

\begin{lemma}\label{lem:scale-free-correlation-nearly-optimal-solution-corrupted-Q}
	Let $Z\in \R^{n\times n}$ be a feasible solution to program \ref{eq:program-small-entries} such that
	\begin{align*}
		\iprod{ZQ+QZ, v \transpose{v}}\geq \frac{\delta^*}{2\alpha}\,.
	\end{align*}
	Then 
	\begin{align*}
		\iprod{Z\tilde{Q}+\tilde{Q}Z, v\transpose{v}} \geq \frac{\delta^*}{2\alpha} -2\frac{\gamma^2}{n}\beta\,.
	\end{align*}
	\begin{proof}
		We may rewrite the product as,
		\begin{align*}
			\iprod{Z\tilde{Q}+\tilde{Q}Z, v\transpose{v}} &= \iprod{Z\Paren{\tilde{Q}-Q+Q}+\Paren{\tilde{Q}-Q+Q}Z, v \transpose{v}}\\
			&= \iprod{Z\Paren{\tilde{Q}-Q}+\Paren{\tilde{Q}-Q}Z, v \transpose{v}} + \iprod{ZQ+QZ, v \transpose{v}}\,.
		\end{align*}
		Since $v$ is a flat unit vector and using the conditions on $Q,\tilde{Q}$,
		\begin{align*}
			\iprod{Z\Paren{\tilde{Q}-Q}, v \transpose{v}} &\leq \frac{1}{n}\underset{i,j \in [n]}{\sum}\Iprod{Z_i,\Paren{ Q-\tilde{Q}}_j}\\
			&\leq \frac{\gamma^2}{n^2}\Normo{Q-\tilde{Q}}\\
			&\leq \frac{\gamma^2}{n}\beta\,.
		\end{align*}
		An analogous computation for $\iprod{\Paren{\tilde{Q}-Q}Z, v \transpose{v}} $ concludes the proof.
	\end{proof}
\end{lemma}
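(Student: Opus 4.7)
The proof is a direct perturbation argument, and the shape of the conclusion (the hypothesis term $\delta^*/(2\alpha)$ minus a small correction $2\gamma^2\beta/n$) makes clear that the plan should be to trade $\tilde{Q}$ for $Q$ and carefully bound the resulting error. I would begin by writing $\tilde Q = Q + (\tilde Q - Q)$ and expanding:
\[
  \iprod{Z\tilde{Q} + \tilde{Q}Z, v\transpose{v}} = \iprod{ZQ + QZ, v\transpose{v}} + \iprod{Z(\tilde{Q}-Q) + (\tilde{Q}-Q)Z, v\transpose{v}}.
\]
The first summand is at least $\delta^*/(2\alpha)$ by hypothesis, so the task reduces to showing that the second summand is at most $2\gamma^2\beta/n$ in absolute value.

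For the error term, note that $\iprod{Z(\tilde{Q}-Q), v\transpose{v}}$ and $\iprod{(\tilde{Q}-Q)Z, v\transpose{v}}$ are essentially symmetric (swapping via the cyclic property of the trace and the symmetry of $v\transpose{v}$), so bounding one of them by $\gamma^2\beta/n$ suffices. Here I would exploit the flatness of $v$: since $v_i = \pm 1/\sqrt n$, every entry of $v\transpose{v}$ equals $\pm 1/n$, and hence $|\iprod{M, v\transpose{v}}| \leq \frac{1}{n}\Normo{M}$ for any matrix $M$. Applying this with $M = Z(\tilde Q - Q)$ and expanding entrywise yields a sum that I can control using two ingredients from \cref{eq:properties-of-Q}: the diagonal constraint $Z_{ii}\leq \gamma^2/n$ from program~\ref{eq:program-scale-free-main} (combined with $Z\sge 0$, which gives the entrywise bound $|Z_{ij}|\leq \sqrt{Z_{ii}Z_{jj}}\leq \gamma^2/n$), and the perturbation bound $\Normo{Q-\tilde{Q}}\leq \beta n$.

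The main (mild) obstacle is just bookkeeping of the $n$-factors: one has to choose the right Hölder-style estimate when expanding $\Normo{Z(\tilde Q - Q)}$ so that the entrywise bound on $Z$ and the $\ell_1$-type bound on $Q - \tilde Q$ combine to exactly $\gamma^2\beta/n$ per summand rather than something worse. Concretely, I would bound
\[
  \textstyle \Normo{Z(\tilde Q - Q)} \;\leq\; \sum_{i,j,k} |Z_{ik}|\,|(\tilde Q - Q)_{kj}|,
\]
pull the factor $\gamma^2/n$ out of $|Z_{ik}|$, and use $\Normo{\tilde Q - Q}\leq \beta n$ on the remaining sum. Doubling the resulting estimate to account for both $Z(\tilde Q-Q)$ and $(\tilde Q-Q)Z$, and subtracting from the hypothesis term $\delta^*/(2\alpha)$, completes the argument. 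No deeper ideas appear necessary beyond the perturbation expansion and the flatness of $v$.
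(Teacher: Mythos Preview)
Your strategy is identical to the paper's: decompose $\tilde Q = Q + (\tilde Q - Q)$, apply the hypothesis to the $Q$ part, and control the cross term using the flatness of $v$, the entrywise bound $|Z_{ij}|\le\sqrt{Z_{ii}Z_{jj}}\le\gamma^2/n$ (from $Z\sge 0$ and the diagonal constraint), and $\Normo{Q-\tilde Q}\le\beta n$.

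One bookkeeping point worth flagging: your concrete route through $\tfrac1n\Normo{Z(\tilde Q-Q)}$ actually delivers $\gamma^2\beta$, not $\gamma^2\beta/n$. After pulling $\gamma^2/n$ out of $|Z_{ik}|$, the remaining sum $\sum_{i,j,k}|(\tilde Q-Q)_{kj}|$ still carries the free index $i$, so it equals $n\cdot\Normo{\tilde Q-Q}\le \beta n^2$, and hence $\tfrac1n\cdot\tfrac{\gamma^2}{n}\cdot\beta n^2=\gamma^2\beta$. The paper's displayed step ``$\le\tfrac{\gamma^2}{n^2}\Normo{Q-\tilde Q}$'' has the same arithmetic slip. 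This is harmless for the downstream application of the lemma---the robustness assumption $8\zeta\gamma^2\beta\le\delta^*/4$ already makes $2\gamma^2\beta$ negligible compared to $\delta^*/(2\alpha)$---but be aware that the estimate the computation actually produces is $2\gamma^2\beta$, not the stated $2\gamma^2\beta/n$.
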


We can now prove \cref{lem:scale-free-correlation}.

\begin{proof}[Proof of \cref{lem:scale-free-correlation}]
	Let $Z\in \R^{n\times n}$ be a feasible solution to program \ref{eq:program-scale-free-main} satisfying
	\begin{align*}
		\iprod{Z, \tilde{Q}^2}\geq 1-e^{2C^*}\cdot \frac{\zeta^2}{\sqrt{\gamma}}-4\zeta\cdot \gamma^2\cdot \beta\,.
	\end{align*}
	By \cref{lem:scale-free-distance-between-matrices} we have
	\begin{align*}
		\iprod{Z, Q^2}\geq 1-e^{2C^*}\cdot \frac{\zeta^2}{\sqrt{\gamma}}-8\zeta\cdot \gamma^2\cdot \beta\,.
	\end{align*}
	By assumptions on the parameters, and applying \cref{lem:scale-free-correlation-nearly-optimal-solution-Q} and \cref{lem:scale-free-correlation-nearly-optimal-solution-corrupted-Q} the result follows.
\end{proof}

\section{Robust Recovery for stochastic block model}\label{sec:sbm-recovery}
%This command prints the bound on the degree
\newcommand{\degbound}{\Delta}
\newcommand{\valueofs}{10^{10}\Paren{1+\frac{1}{\delta}}\Paren{\max \Set{\log d,\log \log \frac{2}{\eps},  1,  \log\frac{1}{\delta}}}^2}

In this section we show how to use \cref{alg:scale-free-recovery} to obtain a robust algorithm for the stochastic block model and prove \cref{thm:main}.
First recall our settings. 

\begin{problem}\label{problem:sbm_technical}
	Let $\delta, \rho > 0$.
	For a pair $(\mathbf{x},\mathbf{G})\sim\sbm$ where $d=\Paren{1+\delta}\frac{4}{\eps^2}$  we are given $\delta$  and a graph $G^\circ$ %tuple $(G^\circ,\delta)$ where $G^\circ$ is a graph 
	obtained from $\mathbf{G}$ applying at most $\rho\cdot n$ arbitrary edge edits.\footnote{Hence we assume $\rho\cdot n$ to be an integer} The goal is then to return a \textit{unit} vector $\hat{\mathbf{x}}\in \R^n$ that is $\delta^{O(1)}$-correlated  with $\mathbf{x}$ in the sense that $\iprod{\hat{\mathbf{x}}, \mathbf{x}}^2\geq \delta^{O(1)}\Snorm{\mathbf{x}}$.
\end{problem}

\begin{remark}[Learning the distribution parameters $\delta,d,\eps$]
	In principle the parameter $d, \epsilon$ of the distribution may not be known.
	However, we can easily learn good bounds on $d,\eps$. A constant upper bound on $d$ can be obtained from $G^\circ$ with a simple counting argument. Since $d\cdot \eps^2> 1$ (as otherwise the problem is impossible to solve), we can lower bound $\eps$ by $1/\sqrt{d}$.
	These constant approximations are precise enough for our purposes. For this reason, we will simply assume that $d, \eps$ are known.
\end{remark}

We state here the main result of the section.  \cref{thm:main}  follows as a corollary.

\begin{theorem}\label{thm:sbm-recovery-technical}
	Let $(\mathbf{x},\mathbf{G})\sim \sbm$ where $d\geq (1+\delta)\frac{4}{\eps^2}$ for some constant $\delta> 0$. 
	Suppose $G^\circ$ is an arbitrary graph that differs from $\mathbf{G}$ in at most $\rho\cdot n$ edges for 
	\begin{align*}
		\rho \leq \Paren{\frac{1}{\delta}\cdot \log \frac{1}{\eps}}^{-O(1/\delta)}\,.
	\end{align*}
	Then, there exists a $n^{\text{poly}(1/\delta,\log d)}$-time algorithm (\cref{alg:sbm-robust-recovery}) that, given $G^\circ$, $\delta$,  computes a $n$-by-$n$ matrix $\mathbf{M}$ 
	such that
	\begin{align*}
		\iprod{\mathbf{M}, \dyad{\mathbf{x}}}&\geq \Snorm{\mathbf{x}}\cdot \Normn{\mathbf{M}}\cdot \delta^{O(1)}\,,
	\end{align*}
	with probability $1-o(1)$.
\end{theorem}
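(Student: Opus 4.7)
The plan is to reduce \cref{thm:sbm-recovery-technical} to the meta-theorem \cref{thm:scale-free-main} by instantiating the matrices $Q$ and $\tilde{Q}$ with the normalized self-avoiding-walk polynomials of \cref{sec:expectation-of-polynomials} applied to truncated versions of $\mathbf{G}$ and $G^\circ$, and then boost the resulting constant success probability to $1-o(1)$ using the blowing-up lemma sketched in \cref{sec:techniques}. Concretely, I would fix a walk length $s = s(\delta, d, \eps) = O(1)$ large enough that the variance-reduction argument of \cite{DBLP:journals/corr/MosselNS13a,DBLP:conf/focs/HopkinsS17} produces a non-trivial spike, a degree cutoff $\Delta$ for truncation, and a Schatten parameter $t = (\log n)/C^*$ with $C^*$ a large enough constant in $1/\delta$ and $\log d$. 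Writing $\bar{\mathbf{G}}, \bar{G}^\circ$ for the graphs obtained after removing vertices of degree exceeding $\Delta$, with centered adjacency matrices $\bar{\mathbf{Y}}, \bar{Y}^\circ$, I would set $\tilde{Q} = N^{-1}\cdot \super{\bar Q}{s}(\bar{Y}^\circ)$ and (unknown) $Q = N^{-1} \cdot \super{\bar Q}{s}(\bar{\mathbf{Y}})$ with normalization $N = \Norm{\super{\bar Q}{s}(\bar{Y}^\circ)}_{2t}$ computable from $G^\circ$, and signal $v = \mathbf{x}/\sqrt{n}$.

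The bulk of the work is verifying \ref{eq:properties-of-Q} for this tuple. For \textbf{correlation}, I would prove the Schatten-norm push-out inequality \cref{eq:techniques-push-out-schatten} by the trace method in the planted distribution, reducing both $\E \Tr \super{\bar Q}{s}(\bar{\mathbf{Y}})^{2t}$ and $\E \Tr (\super{\bar Q}{s}(\bar{\mathbf{Y}}) - \dyad{\mathbf{x}})^{2t}$ to weighted enumerations of $(s,2t)$-block self-avoiding walks as in \cref{eq:informal-trace-expectation-sketch}. The key gain is the centering cancellation of \cref{sec:techniques}: any block self-avoiding walk with one generating walk whose edges are all of multiplicity one contributes zero to the centered trace, and precisely these walks account for a $(1+\delta)^{\Omega(t)}$ fraction of the uncentered trace, yielding $\delta^* = \delta^{\Omega(1)}$. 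This together with a concentration step (bounded-differences on $\bar{\mathbf{Y}}$) gives the Schatten inequality with constant probability. For \textbf{sensitivity}, an analogous trace computation on the diagonal of $Q^{2t-2}$ shows approximate flatness ($\gamma = \polylog n$ suffices); the row-sum bound $\zeta = (O(1/\eps))^s$ is a deterministic consequence of truncation. For \textbf{perturbation}, a single edge edit modifies $O_{s,\Delta}(n)$ entries of $\super{\bar Q}{s}$, each by at most $O_{s,\Delta}(1/n)$ after normalization, so $\beta \le \rho\cdot \poly(d, 1/\eps)^s$.

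With these parameters in hand, the approximability condition $e^{2C^*}\zeta^2/\sqrt{\gamma}\le \delta^*/4$ holds because $\gamma$ is a slowly growing polylogarithm while $\zeta$ is a constant and $C^*$ is a constant, and the robustness condition $8\zeta\gamma^2\beta\le \delta^*/4$ holds provided $\rho \le (\log(1/\eps)/\delta)^{-O(1/\delta)}$, matching the hypothesis. \cref{thm:scale-free-main} then produces, from the observation $\tilde{Q}$, a matrix $\mathbf{M}$ of bounded nuclear norm with $\iprod{\mathbf{M}, v\transpose{v}} \ge (\delta^*/2\alpha) \Normn{\mathbf{M}} = \delta^{O(1)}\cdot \Normn{\mathbf{M}}$, i.e. $\iprod{\mathbf{M},\dyad{\mathbf{x}}}\ge \delta^{O(1)}\cdot \Snorm{\mathbf{x}}\cdot \Normn{\mathbf{M}}$, conditionally on the (constant-probability) event $\cE$ that the correlation and sensitivity inequalities hold for $(\mathbf{x}, \mathbf{G})$.

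To upgrade constant probability to $1-o(1)$, I would invoke the blowing-up lemma on $\cE$ viewed as an event on the $\binom{n}{2}$ independent edge indicators of $\mathbf{G}$ conditioned on $\mathbf{x}$: since $\Pr[\cE]\ge \Omega(1)$, its $\ell_n$-Hamming blowup with $\ell_n = o(n)$ has probability $1-o(1)$. Every graph in this blowup is within $o(n)$ edge edits of some graph in $\cE$; by composing with the $\rho n$ adversarial edits, the input $G^\circ$ is within $(\rho + o(1))n$ edits of such a graph, and we may run \cref{alg:scale-free-recovery} as if the instance came from $\cE$ at the cost of negligibly decreasing the effective $\rho$. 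The main obstacle is the sharp Schatten-norm inequality in step (i): unlike the spectral comparison in \cite{banks19:_local_statis_semid_progr_commun_detec}, we need a multiplicative $(1-\delta^{\Omega(1)})$ gap in the Schatten norm at a carefully tuned $t = \Theta(\log n)$, and crucially carry out the trace method under the planted distribution and under truncation; this is where the delicate block-self-avoiding-walk combinatorics of \cref{sec:bounds-moments-Q} and the technical appendices come into play.
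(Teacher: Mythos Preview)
Your overall architecture matches the paper's: instantiate \cref{thm:scale-free-main} with the truncated self-avoiding-walk matrix, verify \ref{eq:properties-of-Q} via the block-SAW trace method, then boost. Two points deserve attention, one of which is a genuine gap.

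\textbf{The boosting step as written does not work.} You propose to apply the blowing-up lemma to the $\binom{n}{2}$ conditionally independent edge indicators and take an $\ell_n = o(n)$ blowup. But with $N = \binom{n}{2} = \Theta(n^2)$ coordinates, Marton's inequality gives
\[
\Pr[\Gamma^{\ell}(\cE)] \ge 1 - \exp\Paren{-\tfrac{2}{N}\Paren{\ell - \sqrt{\tfrac{N}{2}\log(1/\Pr[\cE])}}^2},
\]
which only tends to $1$ when $\ell = \omega(\sqrt{N}) = \omega(n)$. Since the algorithm is robust to only $\rho n = \Theta(\sqrt{N})$ edge changes, the naive application yields nothing. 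The paper handles this (see \cref{sec:boosting}) by repackaging the randomness: it partitions the $\binom{n}{2}$ potential edges into $n$ blocks of size $\Theta(n)$ and represents $\mathbf{G}$ by the $n$-tuple $(\mathbf{z}_1,\ldots,\mathbf{z}_n)$ of edge-sets present in each block. By sparsity, with probability $1 - o(e^{-n^{1/4}})$ each $|\mathbf{z}_i| \le n^{3/10}$, so a single Hamming change in this representation corresponds to at most $O(n^{3/10})$ edge changes. Now the blowing-up lemma on $n$ coordinates with $\ell = n^{2/3}$ gives probability $1 - e^{-n^{1/3}}$ while inducing only $n^{2/3}\cdot n^{3/10} = o(n)$ edge changes. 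A second application over the labels $\mathbf{x}$ removes the conditioning. Without this repackaging your argument stalls.

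\textbf{A secondary parameter issue.} You take $\gamma = \polylog n$. That would make the approximability check $e^{2C^*}\zeta^2/\sqrt{\gamma} \le \delta^*/4$ easy, but it breaks robustness: the condition $8\zeta\gamma^2\beta \le \delta^*/4$ would then force $\rho \le 1/\polylog n$, contradicting the constant-$\rho$ conclusion. The paper shows (\cref{thm:sbm-bound-second-moment-large-t}) that $\gamma$ can be taken to be a \emph{universal constant}; this requires a genuine second-moment computation on the diagonal entries of $\Q(\overline{\mathbf{Y}})^{2t-2}$, not just a crude bound. Relatedly, the paper normalizes $\tilde{Q}$ by the deterministic quantity $\bigl(\E\Tr(\Q(\overline{\mathbf{Y}}))^{2t}\bigr)^{1/2t}$ rather than by a statistic of $G^\circ$; your choice $N = \Norm{\super{\bar Q}{s}(\bar{Y}^\circ)}_{2t}$ is computable but would need an additional argument that it is within $1\pm o(1)$ of the ideal normalization uniformly over adversarial $G^\circ$.
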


We split the proof of \cref{thm:sbm-recovery-technical} in two steps. First, we apply the results of section \cref{sec:scale-free-recovery-algorithm}.  That is, we show there are matrix polynomials $\mathbf{Q},\tilde{Q}$, computable respectively from the adjacency matrices $\mathbf{Y}, Y^\circ$ of  $\mathbf{G}$ and $G^\circ$ in polynomial time, such that with \textit{constant} probability $(\mathbf{Q},\tilde{Q},\mathbf{x})$ satisfies \ref{eq:properties-of-Q} for some meaningful set of parameters. Second, in \cref{sec:boosting} we show how to turn this into a high probability statement. 

Notice that as an immediate consequence of \cref{thm:sbm-recovery-technical}, applying the rounding  \cref{rounding_lemma} one gets the following corollary, which is a stronger version of the main theorem.

\begin{corollary}[Formal version of the main theorem]\label{thm:sbm-recovery-technical-vector}
	For $n$ large enough, let $(\mathbf{x},\mathbf{G})\sim \sbm$ where $d\geq (1+\delta)\frac{4}{\eps^2}$ for some constant $\delta> 0$. 
	Suppose $G^\circ$ is an arbitrary graph that differs from $\mathbf{G}$ in at most $\rho\cdot n$ edges for $\rho$ as in \cref{thm:sbm-recovery-technical}. 
	Then, there exists a polynomial-time algorithm  that, given $G^\circ$, $\delta$,  computes a  $n$-dimensional unit vector $\hat{\mathbf{x}}$ such that
	\begin{align*}
		\E_{\hat{\mathbf{x}}}
		\iprod{\hat{\mathbf{x}}, \mathbf{x}}^2 &\geq \Snorm{\mathbf{x}}\cdot\delta^{O(1)}\,.
	\end{align*}
	with high probability over $(\mathbf{x}, \mathbf{G})$.
\end{corollary}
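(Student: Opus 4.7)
The corollary follows essentially as a direct combination of \cref{thm:sbm-recovery-technical} with the Rounding Lemma (\cref{rounding_lemma}), so the plan is to chain these two results together.

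First, I would invoke \cref{thm:sbm-recovery-technical} on the input $(G^\circ, \delta)$ to obtain, in polynomial time, a matrix $\mathbf{M} \in \R^{n \times n}$ satisfying
\begin{align*}
	\iprod{\mathbf{M}, \dyad{\mathbf{x}}} \geq \Snorm{\mathbf{x}} \cdot \Normn{\mathbf{M}} \cdot \delta^{O(1)}
\end{align*}
with probability $1-o(1)$ over $(\mathbf{x},\mathbf{G})$. Since $\mathbf{x} \in \set{\pm 1}^n$, we have $\Snorm{\mathbf{x}} = n$, so after setting $v = \mathbf{x}/\sqrt{n} \in \set{\pm 1/\sqrt{n}}^n$ (a unit flat vector), the bound rewrites as
\begin{align*}
	\iprod{\mathbf{M}, \dyad{v}} = \tfrac{1}{n}\iprod{\mathbf{M}, \dyad{\mathbf{x}}} \geq \Normn{\mathbf{M}} \cdot \delta^{O(1)} \,.
\end{align*}

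Next, I would apply \cref{rounding_lemma} with this $M = \mathbf{M}$, $v$ as above, and correlation parameter $\delta^* = \delta^{O(1)}$ (matching the constant hidden in the $O(1)$). The lemma produces a random unit eigenvector $\hat{\mathbf{v}}$ (chosen uniformly from the top $O(1/\delta^*) = \delta^{-O(1)}$ eigenvectors of $\mathbf{M}$) satisfying
\begin{align*}
	\E_{\hat{\mathbf{v}}} \iprod{\hat{\mathbf{v}}, v}^2 \geq \tfrac{(\delta^*)^3}{8} = \delta^{O(1)}\,.
\end{align*}
Setting $\hat{\mathbf{x}} \seteq \hat{\mathbf{v}}$ and undoing the rescaling $v = \mathbf{x}/\sqrt{n}$ gives $\iprod{\hat{\mathbf{x}}, \mathbf{x}}^2 = n \cdot \iprod{\hat{\mathbf{x}}, v}^2$, hence in expectation over the rounding randomness,
\begin{align*}
	\E_{\hat{\mathbf{x}}} \iprod{\hat{\mathbf{x}}, \mathbf{x}}^2 \geq n \cdot \delta^{O(1)} = \Snorm{\mathbf{x}} \cdot \delta^{O(1)}\,,
\end{align*}
which is precisely the claim. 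The high-probability guarantee of \cref{thm:sbm-recovery-technical} over the randomness of $(\mathbf{x},\mathbf{G})$ transfers directly, since the rounding step is deterministic given $\mathbf{M}$ up to the uniform choice from a constant-size set.

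There is no genuine obstacle here, so the proof is essentially a two-line reduction; the only bookkeeping point worth checking is that the constant hidden in the exponent on the right-hand side of \cref{thm:sbm-recovery-technical} can be taken large enough that $\delta^{O(1)}$ exceeds the threshold needed to meaningfully apply the rounding lemma (i.e., that $2/\delta^* \leq n$ so that we are rounding among at most $n$ eigenvectors), which holds trivially for $n$ large enough given that $\delta$ is a constant.
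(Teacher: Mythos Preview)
Your proposal is correct and matches the paper's approach exactly: the paper also states that the corollary is an immediate consequence of \cref{thm:sbm-recovery-technical} together with the rounding lemma \cref{rounding_lemma}, and does not provide any more detail than you have.
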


\subsection{Applying the meta-algorithm to the stochastic block model}\label{sec:meta-algorithm-to-sbm}
In this section we prove the following weaker version of \cref{thm:sbm-recovery-technical}.

\begin{theorem}\label{thm:sbm-recovery-technical-constant-probability}
	The result in \cref{thm:sbm-recovery-technical} holds with probability at least $0.99$.
\end{theorem}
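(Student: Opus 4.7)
The plan is to instantiate the meta-algorithm of \cref{sec:scale-free-recovery-algorithm} with matrices built from the graph. Fix a sufficiently large constant $s = s(\delta, d, \eps)$ of order $\valueofs$, and set $t = \frac{1}{C^*}\log n$ for an appropriate constant $C^* = C^*(s) > 0$. Let $\bar{\mathbf{Y}}$ (resp.\ $\bar{Y}^\circ$) denote the centered adjacency matrix of the graph obtained from $\mathbf{G}$ (resp.\ $G^\circ$) by zeroing out rows and columns corresponding to vertices of unusually large degree (above a threshold $\Delta = O_d(1)$). Define
\begin{align*}
	\mathbf{Q} \seteq \frac{\super{Q}{s}(\bar{\mathbf{Y}})}{\Norm{\super{Q}{s}(\bar{\mathbf{Y}})}_{2t}}\mcom \qquad
	\tilde{Q} \seteq \frac{\super{Q}{s}(\bar{Y}^\circ)}{\Norm{\super{Q}{s}(\bar{\mathbf{Y}})}_{2t}}\mcom
\end{align*}
and take $v = \mathbf{x}/\sqrt{n} \in \Set{\pm 1/\sqrt n}^n$ and $\alpha = n/\Norm{\super{Q}{s}(\bar{\mathbf{Y}})}_{2t}$, so that $\alpha\cdot \dyad v = \dyad{\mathbf{x}}/\Norm{\super{Q}{s}(\bar{\mathbf{Y}})}_{2t}$.

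Next I need to verify the six conditions of \cref{eq:properties-of-Q} with good parameters, each holding with probability at least $0.999$ (so the union bound still yields $0.99$). The normalization gives $\Tr \mathbf{Q}^{2t} = 1$ for free. The heart of the argument is the \emph{correlation inequality}
\begin{align*}
	\Tr(\mathbf{Q} - \alpha \dyad v)^{2t} \leq (1-\delta^*)^{2t}\mper
\end{align*}
For this I appeal to the probabilistic bounds of \cref{sec:bounds-moments-Q} (previewed around \cref{eq:informal-separation-expectation}): a lower bound on $\Esbm \Norm{\super{Q}{s}(\bar{\mathbf{Y}})}_{2t}^{2t}$ and a matching upper bound on $\Esbm \Norm{\super{Q}{s}(\bar{\mathbf{Y}}) - \dyad{\mathbf{x}}}_{2t}^{2t}$ that is smaller by a factor of $(1+\delta)^{-\Omega(t)}$; extracting $2t$-th roots and applying Markov-type concentration yields $\delta^* = \delta^{\Theta(1)}$. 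The \emph{sensitivity / flatness} conditions $\sum_i (\mathbf{Q}^{2t-2})_{ii}^2 \leq \frac{\gamma}{n}(\Tr \mathbf{Q}^{2t-2})^2$ and $\max_i \sum_j \Abs{\mathbf{Q}_{ij}} \leq \zeta$ follow from the same block-self-avoiding-walk counting combined with the degree cap $\Delta$, which limits row sums of $\super{Q}{s}$ entrywise; this gives $\gamma, \zeta$ depending only on $\delta, d, \eps, s$.

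For the \emph{perturbation} bound $\frac{1}{n}\Normo{\mathbf{Q} - \tilde{Q}} \leq \beta$, I observe that each edge edit affects only those entries $\super{Q}{s}_{ij}(\bar{\mathbf{Y}})$ corresponding to pairs $(i,j)$ admitting a length-$s$ self-avoiding walk through the altered edge. A combinatorial count of such walks in the degree-truncated graph, together with the normalization $\Norm{\super{Q}{s}(\bar{\mathbf{Y}})}_{2t} \asymp n$, yields $\beta \lesssim \rho \cdot \Paren{\tfrac{1}{\delta}\log\tfrac{1}{\eps}}^{O(s)}$. With our choice of $\rho$ from the theorem statement, $\beta$ is small enough that the approximability and robustness conditions \cref{eq:scale-free-approximability}-\cref{eq:scale-free-robustness} are satisfied; the row-sum bound for $\tilde Q$ follows identically. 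Applying \cref{thm:scale-free-main} then produces in polynomial time a matrix $\mathbf{M}$ with $\Normn{\mathbf{M}} = O(1)$ and $\iprod{\mathbf{M}, \dyad v} \geq (\delta^*/(2\alpha) - \gamma^2\beta/n) \Normn{\mathbf{M}}$. Substituting $\dyad v = \dyad{\mathbf{x}}/\Snorm{\mathbf{x}}$ and using $\alpha = O(1)$ (since $\Norm{\super{Q}{s}(\bar{\mathbf{Y}})}_{2t} \asymp n$) gives the claimed bound $\iprod{\mathbf{M}, \dyad{\mathbf{x}}} \geq \delta^{O(1)} \Snorm{\mathbf{x}} \Normn{\mathbf{M}}$.

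The main obstacle is the correlation inequality. Compared to the distinguishing argument of \cite{banks19:_local_statis_semid_progr_commun_detec}, here the ratio between the centered and uncentered Schatten traces must survive the $2t$-th root at $t = \Theta(\log n)$, forcing sharp control of leading-order terms on both sides. Moreover the trace computation must be carried out in the \emph{planted} measure $\sbm$ (because the $\dyad{\mathbf{x}}$ correction is conditional on $\mathbf{x}$) and after the degree-truncation step, so the graph-counting reduction of \cref{sec:bounds-moments-Q} must simultaneously handle conditional expectations of products of centered Bernoullis and the combinatorial effect of truncation, which is why the bulk of the paper is devoted to this ingredient.
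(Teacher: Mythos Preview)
Your proposal is correct and follows essentially the same approach as the paper: instantiate the meta-algorithm with the normalized $\super{Q}{s}$ matrices, verify the conditions \cref{eq:properties-of-Q} via the Schatten-norm trace bounds of \cref{sec:bounds-moments-Q} (packaged there as \cref{lem:every-little-thing-is-gonna-be-alright}), bound the perturbation via a combinatorial count of walks through altered edges, and invoke \cref{thm:scale-free-main}. One small algorithmic point you glossed over: since the algorithm only sees $G^\circ$, it cannot normalize $\tilde{Q}$ by the random quantity $\Norm{\super{Q}{s}(\bar{\mathbf{Y}})}_{2t}$; the paper instead normalizes $\tilde{Q}$ by the \emph{expected} trace $\bigl(\E\Tr(\super{Q}{s}(\bar{\mathbf{Y}}))^{2t}\bigr)^{1/2t}$ (while still normalizing $\mathbf{Q}$ by the random one so that $\Tr\mathbf{Q}^{2t}=1$ exactly), and uses concentration of the trace around its mean to absorb the $(1\pm o(1))$ mismatch into the perturbation parameter.
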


For a graph $G$ on $n$ vertices, recall the definition of the centered adjacency matrix $Y(G)\in \R^{n\times n}$, where for $i,j \in [n]$
\begin{align*}
	Y_{ij}(G):=\begin{cases}
		1-\frac{d}{n} &\text{if }ij \in E(G)\\
		-\frac{d}{n}&\text{ if } ij \notin E(G),\\
		0 &\text{if } i=j\,.
	\end{cases}
\end{align*}
When the context is clear we simply write $Y$.
As already discussed in \cref{sec:techniques}  and as we will more extensively explain in \cref{sec:bounds-moments-Q} and \cref{sec:appendix-lower-bound-non-centered}, we consider the following truncated version of the adjacency matrix.

\begin{definition}[$\Delta$-truncated adjacency matrix]\label{def:truncated_adjacency_matrix}
	Let $G$ be a graph over $[n]$ and let $\Delta\geq 0$ be an integer. We define $\overline{Y}(G)\in \Set{-\frac{d}{n},0,1-\frac{d}{n}}^{n\times n}$ to be the matrix with entries 
	\begin{align*}
			\overline{Y}_{ij}(G):=\begin{cases}
			1-\frac{d}{n} &\text{if }ij \in E(G) \text{ and }d^G(i)\leq \Delta,\, d^G(j)\leq \Delta \\
			-\frac{d}{n} &\text{if }ij \notin E(G) \text{ and }d^G(i)\leq \Delta,\, d^G(j)\leq \Delta \\
			0 &\text{otherwise.}
		\end{cases}
	\end{align*}
	for any $i,j \in [n]$. We we will denote the graph obtained from $G$ by removing vertices of degree larger than $\Delta$ by $\overline{G}$. 
\end{definition}
The matrices of interest will be the block self-avoiding walk matrix polynomials as defined in  \cref{eq:definition-polynomial-q}. We restate the definition here. For a fixed integer $s>0$, and a centered adjacency matrix $Y$ of a graph $G$, let $\Q(Y):\R^{n\times n}\rightarrow\R^{n\times n}$ be the matrix polynomial with entries:
\begin{align}
	Q_{ij}^{(s)}(Y(G))   = 
	\begin{cases}
		\frac{1}{\card{\saw{ij}{s}}}\Paren{\frac{2n}{\eps \cdot d}}^{s}\underset{H\in \saw{ij}{s}}{\sum} Y_H(G) & \text{if $i\neq j$\,,}\\
		0 &\text{otherwise.}
		\end{cases}
\end{align}
For $\mathbf{G},G^\circ$ as in \cref{thm:sbm-recovery-technical-constant-probability}, we simplify $Y(\mathbf{G})$ to $\mathbf{Y}$ and $Y(G^\circ)$ to $Y^\circ$. Similarly we  define $\overline{\mathbf{Y}}$ and $\overline{Y^\circ}$ to be the $\Delta$-truncated centered adjacency matrices of $\mathbf{G}$ and $G^\circ$.

The choice of the truncation threshold $\Delta>0$ depends on the proof of the  \cref{eq:techniques-push-out-schatten}.
As explained in \cref{sec:techniques},  $\Delta$ has to be a constant if we hope to solve \cref{problem:sbm_technical} for constant $\rho \in \brac{0,1}$. Indeed the smaller the  choice of $\Delta$ the larger the constant fraction $\rho$ of corruptions that our algorithm can tolerate. On the other hand, the threshold cannot be too small as otherwise we may loose too much information. 
With these mild conditions in mind, for technical reasons, we define 
\begin{align}
	\label{eq:Delta-form}
	\Delta=\left\lceil\max\left\{128e^4d^4,40Asd, 2\log(2As)+ 12A\tau s^2\cdot \log 2 + 8A^2\tau^2s^2 \left(\log\frac{6}{\epsilon}\right)^2\right\}\right\rceil\,,
\end{align}
where  $A\geq 1000s^3$ and 
\begin{align}
	\tau&=As\log\frac{6}{\epsilon}\,.\label{eq:deg1-threshold}
\end{align}
We remark that this is not a delicate choice in the sense that the results we will prove hold for many larger values of $\Delta$, as long as the fraction of corruptions $\rho$ is a small constant.
However, it is important to observe that $\Delta$ is polynomial in $(d, s)$, this will turn out to be especially useful in computing moments of $\Q( \overline{\mathbf{Y}})$ and $\Q( \overline{Y^\circ})$.
Finally, notice that $\Delta$ mildly depends on $\epsilon$, this suggests that the fraction of corruption that our algorithm can tolerate decreases as $\log \frac{1}{\eps}$ increases. We defer a more detailed discussion to \cref{sec:appendix-lower-bound-non-centered}.

The algorithm we  use is the following.
\begin{algorithmbox}[Robust Recovery for SBM]\label{alg:sbm-robust-recovery}
	\mbox{}\\
	\textbf{Input:} An instance $(G^\circ,d,\epsilon)$ of \cref{problem:sbm_technical}.
	\begin{enumerate}[1.]
		\item Fix  $\Delta,A,\tau$ as in \cref{eq:Delta-form} and $t=\log n/400$. 
		
		Let $s\geq \valueofs$.
		
		Compute $\Q(\overline{Y^\circ})$ where $\overline{Y^\circ}$ is the $\Delta$-truncation of $Y^\circ$.
		\item Run \cref{alg:scale-free-recovery} on the rescaled matrix  $\Q(\overline{Y^\circ})/\Paren{  \E\Tr\Paren{\Q(\overline{\mathbf{Y}})}^{2t}}^{1/2t}$ with $C^*=400$ and a large enough universal constant $\gamma>0$.
	\end{enumerate}
\end{algorithmbox}

The proof that \cref{alg:sbm-robust-recovery} solves \cref{problem:sbm_technical} essentially amounts to showing that the tuple
$$\Paren{\Q(\overline{\mathbf{Y}})/\Paren{\Tr\Paren{\Q(\overline{\mathbf{Y}})}^{2t}}^{1/2t},\Q(\overline{Y^\circ})/\Paren{\E\Tr\Paren{\Q(\overline{\mathbf{Y}})}^{2t}}^{1/2t}, \mathbf{x}/\norm{\mathbf{x}}}$$
satisfies \ref{eq:properties-of-Q} for a meaningful range of parameters, with sufficiently large probability over the realization of $\mathbf{G}, \mathbf{x}$, \textit{for all} admissible $G^\circ$.
In particular, the main additional ingredients needed to prove \cref{thm:sbm-recovery-technical-constant-probability} consists of bounds on the moments of $\Q(\overline{\mathbf{Y}})$ and $\Q(\overline{\mathbf{Y}})-\dyad{\mathbf{x}}$.  
We remark that, since in practice we don't have access to $\Paren{\Tr\Paren{\Q(\overline{\mathbf{Y}})}^{2t}}^{1/2t}$, we can only scale down $\Q(\overline{Y^\circ})$ by the expectation of $\Paren{\Tr\Paren{\Q(\overline{\mathbf{Y}})}^{2t}}^{1/2t}$. It turns out however that this random variable concentrates around its expectation, so the error introduced with this rescaling will be negligible. 

For simplicity let us write
\begin{align*}
	\mathbf{Q} &:= \Q(\overline{\mathbf{Y}})/\Paren{\Tr\Paren{\Q(\overline{\mathbf{Y}})}^{2t}}^{1/2t}\\
	\tilde{Q} &:=\Q(\overline{Y^\circ})/\Paren{\E\Tr\Paren{\Q(\overline{\mathbf{Y}})}^{2t}}^{1/2t}\,,\\
	\overline{\mathbf{x}} &:= \mathbf{x}/\norm{\mathbf{x}}\,.
\end{align*}
The central step in the proof of  \cref{thm:sbm-recovery-technical-constant-probability} is the statement below.
\begin{lemma}\label{lem:sbm-recovery-appropriate-parameters}
	Consider the settings of \cref{thm:sbm-recovery-technical-constant-probability}
	Then the tuple $(\mathbf{Q},	\tilde{Q},\overline{\mathbf{x}} )$  satisfy  \cref{eq:properties-of-Q} with parameters 
	\begin{enumerate}
		\item $\alpha\leq 2\,,$
		\item $\delta^*=\delta^{O(1)}\,,$
		%\item $\gamma = C^*$
		\item $\beta = 2\rho\cdot \degbound^{s+1}\,,$
		\item $C^*=400$
		\item $\zeta =10\degbound^{s+1}\cdot e^{2C^*}\,,$
	\end{enumerate}
	and a constant $\gamma>0$ depending only on $s, C^*, \log (1/\epsilon)$, with probability at least $0.99$.
\end{lemma}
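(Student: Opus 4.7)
The plan is to verify each of the conditions \ref{eq:properties-of-Q} in turn for the tuple $(\mathbf{Q}, \tilde{Q}, \overline{\mathbf{x}})$, treating the correlation inequality as the only deep step and the remaining bounds as near-deterministic consequences of the truncation. The normalization $\Tr \mathbf{Q}^{2t} = 1$ holds by construction. For the centered Schatten bound, I would rely on the moment inequalities to be established in \cref{sec:bounds-moments-Q}, namely
\[\E \Tr \Q(\overline{\mathbf{Y}})^{2t} \geq L \qquad\text{and}\qquad \E \Tr \Paren{\Q(\overline{\mathbf{Y}}) - \dyad{\mathbf{x}}}^{2t} \leq (1-\delta')^{2t}\cdot L\]
for some $\delta' = \delta^{O(1)}$ and a common quantity $L$, together with a matching second-moment bound on $\Tr \Q(\overline{\mathbf{Y}})^{2t}$ yielding a Paley--Zygmund lower tail. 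Combined with Markov's inequality on the upper tail of the centered moment, these secure both estimates simultaneously up to constants with probability $0.999$. Setting $\alpha := n/\Paren{\Tr \Q(\overline{\mathbf{Y}})^{2t}}^{1/(2t)}$ so that $\alpha \cdot \overline{\mathbf{x}}\transpose{\overline{\mathbf{x}}}$ matches the rescaling of $\dyad{\mathbf{x}}$, dividing through by $\Tr \Q(\overline{\mathbf{Y}})^{2t}$ delivers the required $(1-\delta^*)^{2t}$ bound with $\delta^* = \delta^{O(1)}$; concentration of $\Paren{\Tr \Q(\overline{\mathbf{Y}})^{2t}}^{1/(2t)}$ around its expectation, which is of order $n$ because the planted rank-one signal contributes $\Theta(n)$ to the Schatten norm, yields $\alpha \leq 2$.

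The sensitivity conditions follow from combinatorial properties of the truncation. Because $\overline{\mathbf{Y}}_{uv}\neq 0$ only when both endpoints have degree at most $\Delta$ in $\mathbf{G}$, for any fixed $i$ at most $O(n\Delta^{s})$ pairs $(i,j)$ receive a nonzero contribution to $Q_{ij}(\overline{\mathbf{Y}})$, each of magnitude $O(1)$ after the rescaling $(2n/(\eps d))^s/\Card{\saw{ij}{s}}$. Dividing by the normalization of order $n$ yields $\zeta = O(\Delta^{s+1})$; the identical argument applied to $\overline{G^\circ}$ (whose truncation threshold is the same $\Delta$) handles the corresponding bound on $\tilde{Q}$. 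For the flatness inequality, a further application of the block-self-avoiding walk machinery of \cref{sec:bounds-moments-Q} shows that $\E \sum_i \Paren{\Q(\overline{\mathbf{Y}})^{2t-2}}_{ii}^2$ is within a constant factor of $\tfrac{1}{n}\bigl(\E\Tr\Q(\overline{\mathbf{Y}})^{2t-2}\bigr)^2$, after which Markov closes the argument; the constant $\gamma$ absorbs the $e^{O(C^*)}$ loss incurred when comparing Schatten norms of different orders through \cref{lem:relationship-between-norms}.

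For the perturbation bound, the key observation is that a single edge edit alters $Q_{ij}(\overline{Y})$ for at most $O(\Delta^{s-1})$ pairs $(i,j)$, each by $O(1)$ after the rescaling in \cref{eq:definition-polynomial-q}, since any walk $H\in\saw{ij}{s}$ affected by the edit must contain that edge and lie within the truncated graph; a minor ancillary argument controls the secondary effect of edits that push a vertex across the degree threshold $\Delta$ (the technical details to be carried out in \cref{sec:appendix-lower-bound-non-centered}). Summing over $\rho n$ allowed edits yields $\normo{\Q(\overline{\mathbf{Y}}) - \Q(\overline{Y^\circ})} \leq O(\rho n\cdot \Delta^{s+1})$, from which the normalization together with concentration of $\Tr \Q(\overline{\mathbf{Y}})^{2t}$ around its expectation (needed to replace the random denominator in $\mathbf{Q}$ with the deterministic denominator in $\tilde{Q}$) gives $\tfrac{1}{n}\normo{\mathbf{Q} - \tilde{Q}} \leq 2\rho\Delta^{s+1} = \beta$. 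A final union bound over the $O(1)$ high-probability events invoked above pushes the joint success probability above $0.99$.

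The main obstacle is the correlation moment bound (the push-out inequality previewed in \cref{sec:techniques}), which requires a sharp two-sided trace-method analysis carried out on the planted distribution rather than the null; all other conditions are combinatorial consequences of the truncation or routine Markov applications once \cref{sec:bounds-moments-Q} is in hand.
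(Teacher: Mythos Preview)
Your proposal is correct and follows essentially the same route as the paper. The paper packages the probabilistic inputs into a single lemma (\cref{lem:every-little-thing-is-gonna-be-alright}) asserting two-sided concentration of $\Tr \Q(\overline{\mathbf{Y}})^{2t}$, the centered-trace upper bound, and the diagonal flatness estimate, all with probability $0.99$; it then reads off the correlation parameters via \cref{lem:sbm-correlation-constraint}, the $\zeta$ bound via a deterministic truncation argument (\cref{lem:sbm-truncation-bound-l1-norm}), and the $\beta$ bound via an edge-by-edge Lipschitz estimate (\cref{fact:sbm-recovery-sequence-of-edits}, \cref{lem:sbm-recovery-distance-after-one-edge-edit}, \cref{lem:sbm-perturbation-constraint}), exactly as you outline.

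Two small remarks. First, for the trace concentration \cref{eq:event-bound-trace} the paper uses Chebyshev from the variance bound $\E[(\Tr \Q^{2t})^2]=(1+o(1))(\E\Tr \Q^{2t})^2$, which gives both tails at once; Paley--Zygmund alone only controls the lower tail, so you would still need Chebyshev (or Markov on the square) for the upper tail---but since you already invoke the second-moment bound, this is purely a naming issue. Second, in the perturbation step the paper separates the threshold-crossing effect cleanly: $\rho n$ edits between $\mathbf{G}$ and $G^\circ$ become at most $2\rho\Delta n$ edits between the truncations $\overline{\mathbf{G}}$ and $\overline{G^\circ}$ (\cref{fact:sbm-recovery-sequence-of-edits}), and each such edit changes $\Normo{\Q(\overline{Y})}$ by at most $n\Delta^s$ (\cref{lem:sbm-recovery-distance-after-one-edge-edit}), yielding the $2\rho\Delta^{s+1}$ bound after normalization. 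Your sketch has the right structure but the intermediate exponents are slightly off; the extra factor of $\Delta$ enters precisely through the threshold-crossing count, not through the per-edit walk count.
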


We  prove \cref{lem:sbm-recovery-appropriate-parameters} in \cref{sec:sbm-satisfies-constraints} and directly use it here.

\begin{proof}[Proof of \cref{thm:sbm-recovery-technical-constant-probability}]
	In order to apply \cref{thm:scale-free-main} we need to show that
	\begin{align*}
		e^{2C^*}\frac{\zeta^2}{\sqrt{\gamma}}&\leq \frac{\delta^*}{4}\,,\\
		8\zeta\cdot \gamma^2\cdot \beta&\leq \frac{\delta^*}{4}\,.
	\end{align*}
	By \cref{lem:sbm-recovery-appropriate-parameters} and choice of $\Delta, s,A,\tau$ for $\rho \leq \frac{\delta^*}{4}\Paren{100 \cdot \gamma^6 \cdot \Delta^{2s+2}\cdot e^{2C^*}}^{-1}$ a direct application of \cref{thm:scale-free-main} shows that with probability at least $0.99$ the algorithm finds a $n$-times-$n$ matrix $\mathbf{M}$ satisfying
	\begin{align*}
		\iprod{\mathbf{M}, \dyad{\mathbf{x}}}\geq \Normf{\dyad{\mathbf{x}}}\cdot \Normn{\mathbf{M}}\cdot \delta^{O(1)}\,.
	\end{align*}

	For the running time dependence, we note that the entries in $Q^{(s)}$ are degree
	$s$ polynomials in $n$ variables, thus we can evaluate the matrix $Q^{(s)}$
	in time $n^{O(s)}$. Then the convex programming, matrix powering and rounding 
	can all be solved in
	$\text{poly}(n)$ time as we take
	\begin{align*}
		s \geqslant 10^{10}\left(1+\frac{1}{\delta}\right)\left(\max \left\{\log d, \log \log \frac{2}{\varepsilon}, 1, \log \frac{1}{\delta}\right\}\right)^{2}\,.
	\end{align*}
	Therefore the running time of the algorithm can be bounded by 
	$n^{\text{poly}(1/\delta,\log d)}$
\end{proof}

\subsubsection{The stochastic block model satisfies correlation, sensitivity and perturbations constraints}\label{sec:sbm-satisfies-constraints}
In this section we prove \cref{lem:sbm-recovery-appropriate-parameters}.
Our central tool will be the following concentration inequalities, which we prove in \cref{sec:bounds-moments-Q}.

\begin{lemma}\label{lem:every-little-thing-is-gonna-be-alright}
	Consider the settings of \cref{thm:sbm-recovery-technical-constant-probability}. Let $\Delta, A, \tau$ be as defined in \cref{eq:Delta-form} and let $t=\frac{\log n}{400}$. Then
	\begin{align}
		\Brac{\Tr\Paren{\Q(\overline{\mathbf{Y}})}^{2t}}^{1/2t}= & \Paren{1\pm o(1)} \E \Brac{\Tr\Paren{\Q(\overline{\mathbf{Y}})}^{2t}}^{1/2t}\,,\label{eq:event-bound-trace}\\
		\Brac{\Tr\Paren{\Q(\overline{\mathbf{Y}})-\dyad{\mathbf{x}}}^{2t}}^{1/2t}\leq & (1+\delta)^{-1/10} \E \Brac{\Tr\Paren{\Q(\overline{\mathbf{Y}})}^{2t}}^{1/2t}\,,\label{eq:event-bound-centered-trace}\\
		\underset{i \in [n]}{\sum}\Paren{\Paren{\Q(\overline{\mathbf{Y}})}^{2t-2}}^2_{ii}\leq &\frac{\gamma}{n}\cdot \Paren{\Tr \Paren{\Q(\overline{\mathbf{Y}})}^{2t-2}}^2\label{eq:event-bound-entries}\,,
	\end{align}
	for a universal constant $\gamma>0$, with probability at least $0.99$.
	Moreover
	\begin{align}
		\frac{n}{2}\leq \E \Brac{\Tr\Paren{\Q(\overline{\mathbf{Y}})}^{2t}}^{1/2t}&\,. \label{eq:event-bound-expectation-trace} 
	\end{align}
\end{lemma}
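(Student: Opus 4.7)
}
The plan is to treat the four claims as separate moment-based estimates. Statement \cref{eq:event-bound-expectation-trace} is a pure expectation lower bound; the other three are high-probability statements that will follow by Markov/Chebyshev once we have sufficiently sharp expectation computations, which in all cases reduce to counting weighted $(s,2t)$-block self-avoiding walks (BSAWs) in $K_n$, exactly as previewed in \cref{sec:techniques}. The organizing identity is that for any even integer $2t$,
\begin{align*}
\E\Tr\Q(\overline{\mathbf{Y}})^{2t}
= \sum_{H}\Paren{\tfrac{2n}{\eps d}}^{2st}\tfrac{1}{\prod_{k}\card{\saw{v_kv_{k+1}}{s}}}\cdot \E \overline{\mathbf{Y}}_H\,,
\end{align*}
where $H$ ranges over closed walks that decompose into $2t$ length-$s$ self-avoiding generating walks $W_{v_1v_2},\dots,W_{v_{2t}v_1}$, and analogous identities hold for $\Tr(\Q-\dyad{\mathbf{x}})^{2t}$ and for the diagonal quantities in \cref{eq:event-bound-entries}. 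After conditioning on $\mathbf{x}$ and using \cref{fact:sbm-expectation-polynomial-2,fact:sbm-expectation-powers-polynomial-2}, each term factors edge-by-edge into a contribution of the form $(d/n)^{\card{E(H)}}$ times an $\mathbf{x}$-dependent product, as displayed in \cref{eq:informal-trace-expectation-sketch}; the truncation to $\overline{\mathbf{G}}$ only perturbs this by controllable error because $\Delta$ is chosen polynomially in $s,d,\log(1/\eps)$.

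For \cref{eq:event-bound-expectation-trace}, I would isolate the ``paired'' BSAWs in which the $2t$ generating walks glue into $t$ disjoint length-$2s$ closed walks with every edge of multiplicity exactly two. For such $H$ the $\mathbf{x}$-factors collapse to $1$, and the combinatorial count gives a contribution of the order $n^{2t+1}$ after including the normalization $(2n/\eps d)^{2st}$ and the factor $(d/n)^{st}$. Since $d\eps^2/4=1+\delta$, this beats all competing classes of BSAWs, and after extracting the $1/2t$-th root with $t=\log n/400$ one obtains $\E[\Tr\Q^{2t}]^{1/2t}\ge n/2$. For \cref{eq:event-bound-trace} I would apply Chebyshev after computing the second moment $\E(\Tr\Q^{2t})^2$; the second moment expands as a double BSAW sum, and a standard ``connected vs. disconnected'' split shows that the disconnected part reproduces $(\E\Tr\Q^{2t})^2$ exactly, while the connected part is smaller by a factor $n^{-\Omega(1)}$. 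Raising to $1/2t$ turns this polynomial gap into $1+o(1)$, giving concentration with probability $1-o(1)$.

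For \cref{eq:event-bound-centered-trace} the key combinatorial ingredient is the cancellation sketched below \cref{eq:informal-trace-method-centered}: any BSAW $H$ in which some generating walk $W_{v_kv_{k+1}}$ has all $s$ of its edges of multiplicity one in $H$ contributes zero to $\E\Tr(\Q(\overline{\mathbf{Y}})-\dyad{\mathbf{x}})^{2t}$. Consequently, only BSAWs in which every one of the $2t$ blocks shares at least one edge with another block survive. I would then argue, by a refined counting relative to the paired BSAWs identified in Step 1, that this constraint costs a factor of at least $(1+\delta)^{-\Omega(t)}$ per surviving walk class, leading to $\E\Tr(\Q-\dyad{\mathbf{x}})^{2t}\le (1+\delta)^{-t/5}\cdot\E\Tr\Q^{2t}$; Markov and the $1/2t$-th root then yield \cref{eq:event-bound-centered-trace}. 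Finally, \cref{eq:event-bound-entries} reduces to flatness of the diagonal of $\Q^{2t-2}$, via the PSD inequality $\sum_i M_{ii}^2\le \max_i M_{ii}\cdot\Tr M$ applied to $M=\Q(\overline{\mathbf{Y}})^{2t-2}$. Bounding $\max_i(\Q^{2t-2})_{ii}$ again reduces to computing $\E\sum_i(\Q^{2t-2})_{ii}^q$ by BSAWs rooted at $i$, and the degree truncation $\Delta$ ensures no vertex can accumulate $\omega(1/n)$ fraction of the trace mass; a Markov bound and union bound over $i$ finish the argument.

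\paragraph{Main obstacle.}
The hardest ingredient is the sharpness of the BSAW count needed for Steps 1 and 3. The trace method is being executed on the \emph{planted} distribution $\sbm$ rather than the usual $\er$ null model, so the edges inside a single block are correlated through $\mathbf{x}$ as well as through the truncation event $\{d^{\mathbf{G}}(v)\le\Delta\}$. Because the saving in \cref{eq:event-bound-centered-trace} is only $(1+\delta)^{-\Omega(t)}$, we need matching lower and upper bounds on the dominant BSAW class up to multiplicative constants depending on $(\delta,s)$ but independent of $n$ - any slack of $n^{\Omega(1)}$ would destroy the comparison after taking the $2t$-th root. Handling the truncated centered adjacency matrix $\overline{\mathbf{Y}}$ while preserving this sharpness, in particular controlling the damage from deleted high-degree vertices, is the central technical difficulty and is where the bulk of the combinatorial work in \cref{sec:bounds-moments-Q,sec:appendix-lower-bound-non-centered,sec:appendix-upper-bound-centered} will be spent.
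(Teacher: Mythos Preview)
Your overall plan—reduce each claim to weighted BSAW counts and then apply Chebyshev/Markov—matches the paper, which packages those counts into \cref{thm:sbm-schatten-norm} and \cref{thm:sbm-bound-second-moment-large-t} and derives the lemma from them. The second-moment route to \cref{eq:event-bound-trace} via a connected/disconnected split is exactly what \cref{thm:sbm-bound-second-moment-large-t} provides at the level of diagonal entries.

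There is, however, a genuine gap in your identification of the dominant BSAW class. You take the main contribution to $\E\Tr\Q(\overline{\mathbf{Y}})^{2t}$ to come from ``paired'' walks with every edge of multiplicity two—the Wigner-type picture where centered entries force each edge to be traversed twice. But here $\E[\mathbf{Y}_{ij}\mid\mathbf{x}]=\tfrac{\eps d}{2n}\mathbf{x}_i\mathbf{x}_j\neq 0$, so multiplicity-$1$ edges \emph{do} contribute after averaging over $\mathbf{x}$ (the product over a cycle equals $1$), and in fact they dominate: a single length-$2st$ cycle with all edges of multiplicity $1$ contributes order $n^{2t}$ after normalization, whereas your paired walks contribute $n^{2t}\cdot(4/\eps^2 d)^{st}=n^{2t}(1+\delta)^{-st}$. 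The paper's lower bound \cref{eq:event-bound-expectation-trace} comes precisely from these multiplicity-$1$ cycles (the class $\nbsaw{s}{2t,0,0}$ in \cref{lem:sbm-contribution-nice-walks}); the fully multiplicity-$2$ walks are handled as part of the \emph{negligible} set in \cref{lem:sbm-upperbound-negligible-walks}. As a side remark, ``$t$ disjoint length-$2s$ closed walks'' cannot literally form a single $(s,2t)$-BSAW.

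This misidentification makes your argument for \cref{eq:event-bound-centered-trace} incoherent. The mechanism behind the centered bound is that the dominant multiplicity-$1$ cycles are exactly the walks annihilated by centering (every generating walk in such a cycle has only multiplicity-$1$ edges, so the conditional expectation factors and vanishes). The walks that \emph{survive} centering are your paired class and its relatives; the $(1+\delta)^{-\Omega(t)}$ gap in \cref{thm:sbm-schatten-norm} is the pre-existing ratio between the cycle class and the surviving class, not an additional penalty on the latter. If the paired walks were dominant to begin with, centering would leave the leading term untouched and no gap would appear. You need to flip the roles: extract the lower bound from the multiplicity-$1$ cycle, and then argue that after centering only the already-subdominant constrained walks remain—this is the content of \cref{sec:upperbound-schatten-norm-centered}.
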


\begin{remark}
	The careful reader may have noticed that so far, we never  explicitly compute the quantity $\Paren{\E\Tr\Paren{\Q(\overline{\mathbf{Y}})}^{2t}}^{1/2t}$. In practice, for each value of $n$, we need to compute this expectation to sufficiently close precision only \textit{once} and not at every run of the algorithm. This can be done efficiently and accurately (with high probability) by sampling several graphs from $\sbm$, compute the corresponding $2t$-Schatten norm and take the average. 
\end{remark}

By scaling $\Tr\mathbf{Q}^{2t}=1$. We show now that $\mathbf{Q}$ satisfy the second correlation constraint in \ref{eq:properties-of-Q} .

\begin{lemma}\label{lem:sbm-correlation-constraint}
	Consider the settings of \cref{thm:sbm-recovery-technical-constant-probability}. Suppose \cref{eq:event-bound-trace} and \cref{eq:event-bound-centered-trace} are satisfied.
	Then  for some $0<\alpha\leq 2$ and $\delta^*=\delta^{O(1)}$,
	\begin{align*}
		\Tr\Paren{\mathbf{Q} -\alpha\cdot \dyad{\overline{\mathbf{x}}}}^{2t}\leq \Paren{1-\delta^*}^{-2t}\,. 
	\end{align*}
	\begin{proof}
		By \cref{eq:event-bound-trace} and \cref{eq:event-bound-centered-trace} 
		\begin{align*}
			\Brac{\Tr\Paren{\Q(\overline{\mathbf{Y}})-\dyad{\mathbf{x}}}^{2t}}^{1/2t} \leq 	\Paren{1+\delta}^{-1/11}\Brac{\Tr\Paren{\Q((\overline{\mathbf{Y}})}^{2t}}^{1/2t}\,.
		\end{align*}
		By \cref{eq:event-bound-expectation-trace} $\Brac{\Tr\Paren{\Q(\overline{\mathbf{Y}})}^{2t}}^{1/2t}\geq n/2$, so
		rescaling by $1/\Brac{\Tr\Paren{\Q(\overline{\mathbf{Y}})}^{2t}}^{1/2t}$, the polynomial satisfy the correlation constraints in \ref{eq:properties-of-Q} with some $0< \alpha \leq 2$ and $\delta^*=\delta^{O(1)}\,.$
	\end{proof}
\end{lemma}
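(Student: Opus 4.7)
The plan is to translate the two hypothesized Schatten bounds on the unnormalized polynomial $Q^{(s)}(\overline{\mathbf{Y}})$ into the normalized bound on $\mathbf{Q}$ that the correlation constraint of $\cA_{\alpha,\beta,\gamma,\delta^*,\zeta,C^*}$ requires. Write $T := \Paren{\Tr Q^{(s)}(\overline{\mathbf{Y}})^{2t}}^{1/2t}$ for the normalizing scalar, so that by definition $\mathbf{Q} = Q^{(s)}(\overline{\mathbf{Y}})/T$ and $\Tr \mathbf{Q}^{2t}=1$, matching the first correlation constraint for free. Since $\overline{\mathbf{x}} = \mathbf{x}/\norm{\mathbf{x}}$ and $\norm{\mathbf{x}}^2 = n$, we have $\dyad{\mathbf{x}} = n\cdot \dyad{\overline{\mathbf{x}}}$, which motivates setting $\alpha := n/T$ so that
\[
\mathbf{Q} - \alpha \cdot \dyad{\overline{\mathbf{x}}} \;=\; \tfrac{1}{T}\Paren{Q^{(s)}(\overline{\mathbf{Y}}) - \dyad{\mathbf{x}}}.
\]

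With this identification, the second correlation bound reduces to an almost mechanical rescaling. Applying eq:event-bound-centered-trace after dividing both sides by $T$ yields
\[
\Brac{\Tr\Paren{\mathbf{Q} - \alpha\cdot \dyad{\overline{\mathbf{x}}}}^{2t}}^{1/2t} \;\leq\; (1+\delta)^{-1/10}\cdot \Brac{\Tr \mathbf{Q}^{2t}}^{1/2t} \;=\; (1+\delta)^{-1/10}.
\]
Choosing $\delta^* := 1-(1+\delta)^{-1/10}$, which expands as $\Theta(\delta)$ for small $\delta$ and is therefore of the form $\delta^{O(1)}$, yields exactly the claimed inequality $\Tr(\mathbf{Q}-\alpha\dyad{\overline{\mathbf{x}}})^{2t}\leq (1-\delta^*)^{2t}$ (I read the exponent $-2t$ in the displayed statement as a typographic slip in light of the definition of $\cA_{\alpha,\beta,\gamma,\delta^*,\zeta,C^*}$).

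It remains to verify that the scalar $\alpha = n/T$ falls in the range $(0,2]$. Here I would combine eq:event-bound-trace with eq:event-bound-expectation-trace: the former guarantees $T = (1\pm o(1))\cdot \E\Brac{\Tr Q^{(s)}(\overline{\mathbf{Y}})^{2t}}^{1/2t}$, and the latter lower bounds the expectation by $n/2$. Together these give $T\geq (1-o(1))\cdot n/2$ for $n$ sufficiently large, hence $\alpha \leq 2/(1-o(1)) \leq 2$ (after absorbing the $o(1)$ term, or by a cosmetic adjustment of the claimed constant). Positivity is automatic since $T > 0$.

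There is no real obstacle here: all of the substantive probabilistic content has been front-loaded into Lemma~\ref{lem:every-little-thing-is-gonna-be-alright} and its constituent Schatten-norm estimates, whose proofs (deferred to Section~\ref{sec:bounds-moments-Q}) are what actually drive the $(1+\delta)^{-1/10}$ gap between the centered and uncentered traces. The present lemma is simply the bookkeeping step that repackages those two concentration inequalities into the normalized form required by the meta-algorithm's correlation conditions.
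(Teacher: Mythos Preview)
Your proof is correct and follows the same approach as the paper: set $\alpha=n/T$, rescale the centered Schatten bound, and use the expectation lower bound to control $\alpha$. One small point worth tightening: when you divide \cref{eq:event-bound-centered-trace} by $T$, the right-hand side is $(1+\delta)^{-1/10}\cdot \E[\Tr Q^{(s)}(\overline{\mathbf{Y}})^{2t}]^{1/2t}/T$, not directly $(1+\delta)^{-1/10}[\Tr\mathbf{Q}^{2t}]^{1/2t}$, so you need \cref{eq:event-bound-trace} at this step as well (not only for bounding $\alpha$) to replace $\E[\ldots]^{1/2t}/T$ by $1\pm o(1)$---the paper absorbs this into the exponent by writing $(1+\delta)^{-1/11}$ in place of $(1+\delta)^{-1/10}$.
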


Next we show how by construction both $\mathbf{Q}$ and $\tilde{Q}$ have columns bounded in $\ell_1$-norm. 

\begin{lemma}\label{lem:sbm-truncation-bound-l1-norm}
	Consider the settings of \cref{thm:sbm-recovery-technical-constant-probability}. Suppose \cref{eq:event-bound-trace} and \cref{eq:event-bound-centered-trace} are satisfied. Then
	\begin{align*}
		\max_{i \in [n]}\underset{j \in [n]}{\sum}\Abs{\mathbf{Q}_\ij}&\leq 2\degbound^{s+1} \\
		\max_{i \in [n]}\underset{j \in [n]}{\sum}\Abs{\tilde{Q}_\ij}&\leq 2 \degbound^{s+1}\,.
	\end{align*}
	\begin{proof}
		It suffices to show a bound for any graph with no  vertex with degree larger than $\Delta$. 
		So let $\overline{G}$ be such a graph. 
		Let $\overline{Y}$ be its centered adjacency matrix (notice that its adjacency matrix is also its $\Delta$-truncated adjacency matrix, so we may use both definition interchangeably). 
		For $u,v \in V(\overline{G})$ and $\ell \in [s]$, define  the set 
		$\cW_{u,\ell}(\overline{G}):=\Set{W\in \saw{u}{s} \suchthat \Card{E(W)\setminus E(\overline{G})} = \ell}$.
		That is $\cW_{u,\ell}(\overline{G})$ contains the set of self-avoiding walks over $K_n$ starting from $u$ which  have exactly $\ell$ edges not in $E(\overline{G})$.
		Notice that for any $W\in \cW_{u,\ell}(\overline{G})$, $\overline{Y}_W\leq \Paren{\frac{d}{n}}^{\ell}$, and
		$\Q_{uv}(\overline{Y})\leq n \underset{W\in \saw{uv}{s}}{\sum} \overline{Y}_W$.
		Recall that we denote the set of vertices at distance $s$ from $u\in V(\overline{G})$ by $N^{s}_{\overline{G}}(u)$.
		Now, for any $u\in [n]$, by assumption $\Card{N^s_{\overline{G}}(u)}\leq \degbound^s$.
		So
		\begin{align*}
			\underset{v \in [n]}{\sum}\Abs{\Q_{uv}(\overline{Y})} &\leq n\cdot \underset{\ell \in [s-1] }{\sum}\quad \underset{W \in \cW_{u,\ell}(H)}{\sum}\Abs{\overline{Y}_W}\\
			&\leq n\cdot \underset{\ell \in [s-1]}{\sum}\quad \underset{W \in \cW_{u,\ell}(H)}{\sum} \Paren{\frac{d}{n}}^{\ell}\,.
		\end{align*}
		For any $\ell \in [s]$, there are at most $\degbound^{s-\ell}\cdot n^{\ell}$ self-avoiding walks in $\cW_{u,\ell}(\overline{G})$, thus
		\begin{align*}
			\underset{v \in [n]}{\sum}\Abs{\Q_{uv}(\overline{Y})} &\leq n\cdot s\cdot  \degbound^{s}\,.
		\end{align*}
		Since \cref{eq:event-bound-trace} and \cref{eq:event-bound-centered-trace} are verified by assumption, applying the analysis above to $\overline{\mathbf{Y}}$, $\overline{Y^\circ}$ and  scaling down the inequalities the result follows.
	\end{proof}
\end{lemma}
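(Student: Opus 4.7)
The plan is to reduce both bounds to a single deterministic estimate on $\sum_{v\in[n]}|Q^{(s)}_{uv}(\overline{Y})|$ whenever $\overline{Y}$ is the centered adjacency matrix of an arbitrary graph $\overline{G}$ with maximum degree at most $\Delta$. By \cref{def:truncated_adjacency_matrix} both $\overline{\mathbf{Y}}$ and $\overline{Y^\circ}$ are of this form, and under the assumed events the denominators defining $\mathbf{Q}$ and $\tilde{Q}$---namely $(\Tr Q^{(s)}(\overline{\mathbf{Y}})^{2t})^{1/2t}$ and $(\E\Tr Q^{(s)}(\overline{\mathbf{Y}})^{2t})^{1/2t}$---are each at least $(1-o(1))\cdot n/2$ by \cref{eq:event-bound-trace} and \cref{eq:event-bound-expectation-trace}. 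Hence a deterministic bound of the shape $\sum_v |Q^{(s)}_{uv}(\overline{Y})|\leq C\cdot n\cdot \Delta^s$, with $C$ independent of $n$, yields the claimed $2\Delta^{s+1}$ after dividing through and using $s\leq \Delta$ from \cref{eq:Delta-form}.

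To prove the deterministic bound, I would classify self-avoiding walks $W \in \saw{u}{s}$ by the number $\ell \in \{0,\ldots,s\}$ of their edges lying \emph{outside} $E(\overline{G})$; write $\cW_{u,\ell}(\overline{G})$ for this set. Two elementary observations combine:
\begin{enumerate}
\item[(i)] on off-graph edges the entry of $\overline{Y}$ equals $-d/n$, while on-graph entries are at most $1-d/n$ in absolute value, so $|\overline{Y}_W|\leq (d/n)^\ell$ for every $W \in \cW_{u,\ell}(\overline{G})$;
\item[(ii)] the walks in $\cW_{u,\ell}(\overline{G})$ can be enumerated step by step: an on-graph step has at most $\Delta$ continuations by the degree bound on $\overline{G}$, and an off-graph step at most $n$ continuations in $K_n$, so $|\cW_{u,\ell}(\overline{G})|\leq \Delta^{s-\ell}\cdot n^\ell$.
\end{enumerate}
Combining these,
\begin{equation*}
\sum_{W \in \saw{u}{s}} |\overline{Y}_W|
\;\leq\; \sum_{\ell=0}^{s} \Delta^{s-\ell}\cdot n^\ell\cdot (d/n)^\ell
\;=\; \sum_{\ell=0}^{s} d^\ell\cdot \Delta^{s-\ell}
\;\leq\; (s+1)\Delta^s,
\end{equation*}
using $d\leq \Delta$ from \cref{eq:Delta-form}. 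The scaling factor $(2n/\epsilon d)^s/|\saw{uv}{s}|$ from the definition of $Q^{(s)}_{uv}$ contributes an additional multiplicative factor of $(1\pm o(1))\cdot n\cdot (2/\epsilon d)^s$; the $n$ matches the $n$ in the target bound while the $(2/\epsilon d)^s$ is constant in $n$.

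The argument for $\overline{Y^\circ}$ is identical, since only the $\Delta$-degree property of the underlying graph was used, and the normalizer for $\tilde{Q}$ is $\geq n/2$ unconditionally by \cref{eq:event-bound-expectation-trace}. The only slightly delicate point, beyond the elementary walk counting, is bookkeeping the $(s,d,\epsilon)$-dependent factor $(2/\epsilon d)^s$ arising from the normalization of $Q^{(s)}$ and checking that it is absorbed into one extra power of $\Delta$; this is precisely what the specific choice of $\Delta$ in \cref{eq:Delta-form}---polynomial in $s$, $d$, and $\log(1/\epsilon)$ through the auxiliary parameter $\tau$---is designed to afford. Everything else is routine.
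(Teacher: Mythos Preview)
Your approach is essentially identical to the paper's: both classify length-$s$ self-avoiding walks from $u$ by the number $\ell$ of off-graph edges, bound $|\overline{Y}_W|\le (d/n)^\ell$, count $|\cW_{u,\ell}|\le \Delta^{s-\ell}n^\ell$, sum to get $\sum_v |Q^{(s)}_{uv}(\overline{Y})|\lesssim n\cdot s\cdot \Delta^s$, and then divide by the normalizer $\ge n/2$ using \cref{eq:event-bound-trace} and \cref{eq:event-bound-expectation-trace}. One minor remark: the ``slightly delicate'' bookkeeping you flag for the factor $(2/\epsilon d)^s$ is in fact trivial, since above the Kesten--Stigum threshold $d>4/\epsilon^2$ one has $\epsilon d>2$ and hence $(2/\epsilon d)^s\le 1$; the paper simply drops this factor when writing $|Q^{(s)}_{uv}(\overline{Y})|\le n\sum_W |\overline{Y}_W|$, and you can do the same without invoking the specific form of $\Delta$.
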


\cref{eq:event-bound-entries} immediately implies the remaining sensitivity constraint on $\mathbf{Q}$. Thus we only need to show that $\mathbf{Q}$ and $\tilde{Q}$ are close in a $\ell_1$-norm sense.
To do that, we need to first argue how many different edges $\overline{\mathbf{G}}$ and $\overline{G^\circ}$ may have.

\begin{fact}\label{fact:sbm-recovery-sequence-of-edits}
Let $\Delta\geq 0$ be an integer.
Let $G, G^\circ$ be graphs on $n$ vertices that differs in at most $\rho\cdot n$ edges, for some $\rho\geq 0$.
Let $\overline{G}$ and $\overline{G^\circ}$ be respectively the graphs obtained from $G$ and $G^\circ$ by removing all vertices with degree larger than $\Delta$.
Then $\overline{G}$ and $\overline{G^\circ}$ differs by at most $2\rho\cdot n\cdot \degbound$ edges.
\begin{proof}
	$\overline{G}$ and $G^\circ$ differs by $\rho\cdot n$ edges. Each such edge changes the degree of at most $2$ vertices, thus after the truncation $\overline{G}$ and $\overline{G^\circ}$ differs by at most $2\rho\cdot n\cdot \Delta$ edges.
\end{proof}
\end{fact}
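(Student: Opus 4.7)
The plan is a direct counting argument built around the set $B$ of vertices incident to at least one edited edge. Since there are at most $\rho n$ edits and each contributes two endpoints, $|B|\le 2\rho n$.

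The first observation is that $E(\overline{G})\triangle E(\overline{G^\circ})$ is supported on edges incident to $B$. Indeed, suppose $e=\{u,v\}$ has $u,v\notin B$. Then no edit touches $u$ or $v$, so on the one hand $e\in E(G)\iff e\in E(G^\circ)$, and on the other hand $\deg_G(u)=\deg_{G^\circ}(u)$ and $\deg_G(v)=\deg_{G^\circ}(v)$, so $u$ and $v$ have identical truncation status in the two graphs. Hence $e$ belongs to both or neither of $\overline{G}$ and $\overline{G^\circ}$, and therefore not to their symmetric difference.

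It then remains to count edges incident to $B$ in $E(\overline{G})\cup E(\overline{G^\circ})$. For each $w\in B$, any edge of $\overline{G}$ incident to $w$ can exist only if $w$ survives the truncation in $G$, i.e.\ $\deg_G(w)\le\Delta$, and there are at most $\Delta$ such edges; symmetrically for $\overline{G^\circ}$. Summing over $w\in B$ and charging each edge of $E(\overline{G})\triangle E(\overline{G^\circ})$ to a unique incident boundary vertex yields a bound of $|B|\cdot\Delta\le 2\rho n\cdot\Delta$, as claimed.

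The only minor subtlety is the charging step (to avoid double-counting edges with both endpoints in $B$), but since we only need an upper bound the accounting is routine: each symmetric-difference edge has at least one endpoint in $B$, so attributing it to any such endpoint suffices, and the per-vertex cap of $\Delta$ closes the argument.
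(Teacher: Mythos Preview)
Your approach matches the paper's: both arguments isolate the set $B$ of at most $2\rho n$ vertices touched by edits and bound the symmetric difference by counting edges incident to $B$ in the truncated graphs. One small slip: the per-vertex cap you invoke should be $2\Delta$, not $\Delta$, since a vertex $w\in B$ that survives in \emph{both} graphs can contribute up to $\Delta$ incident edges from $\overline G$ and another $\Delta$ from $\overline{G^\circ}$ to the symmetric difference; your charging as written thus yields only $|B|\cdot 2\Delta\le 4\rho n\Delta$. The paper's one-line proof glosses over the same point. The extra factor of $2$ is immaterial downstream, and if you want exactly $2\rho n\Delta$ the clean fix is an edit-by-edit argument: a single edge insertion or deletion changes the truncation status of at most two vertices, each such flip affecting at most $\Delta$ edges of the truncated graph (since on the side where the vertex survives its degree is at most $\Delta$), so $\rho n$ edits move at most $2\rho n\Delta$ edges by the triangle inequality.
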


Next we show that if $G$ and $G^\circ$ are two arbitrary graphs (with bounded maximum degree) which differ by at most one edge, than the matrices $\Q(\overline{Y}(G))$ and $\Q(\overline{Y}(G^\circ))$ are close.

\begin{lemma}\label{lem:sbm-recovery-distance-after-one-edge-edit}
	Let $\Delta\geq 0$ be an integer.
	Let $\overline{G}$ be a graph on $n$ vertices with maximum degree at most $\Delta$.
	 Let $\overline{G^\circ}=\overline{G}+uv$ for some $u,v\in V(\overline{G})$ such that $uv\notin E(\overline{G})$ and $d^{\overline{G^\circ}}(u),d^{\overline{G^\circ}}(v)\leq \Delta$.
	Denote respectively by $\overline{Y},\overline{Y^\circ}$ the centered adjacency matrices of $\overline{G}$ and $\overline{G^\circ}$. 
	Then 
	\begin{align*}
		\Normo{\Q(\overline{Y})-\Q(\overline{Y^\circ})}\leq n\cdot \degbound^{s}\,.
	\end{align*}
	\begin{proof}
		It suffices to consider length-$s$ self-avoiding walks traversing $uv$.
		We reuse the notation introduced in \cref{lem:sbm-truncation-bound-l1-norm}, thus for $a,b \in V(\overline{G^\circ})$ let $\cW_{a,b,\ell}(\overline{G^\circ}):=\Set{W\in \saw{ab}{s}\suchthat\Card{E(W)\setminus E(\overline{G^\circ})}=\ell}$.
		Furthermore, for $\ell \leq s$ 
		consider the set 
		\begin{align*}
			\cW_{a,b,\ell}^{uv}(\overline{G^\circ}):=\Set{W \in \cW_{a,b,\ell}(\overline{G^\circ})\suchthat  uv \in E(W)}\,.
		\end{align*}
		In other words, we look at the subsets of self-avoiding walks in  $\cW_{a,b,\ell}$ containing the edge $uv$.
		Then,
		\begin{align*}
			\Normo{\Q(\overline{Y})-\Q(\overline{Y^\circ})}&\leq n\cdot \underset{0\leq \ell \leq s-1}{\sum}\quad \underset{a,b\in [n]}{\sum} \quad \underset{W\in\cW_{a,b,\ell}^{uv}(\overline{G^\circ}) }{\sum} \Abs{\overline{Y}_W-\overline{Y^\circ}_W}\\
			&\leq n\cdot \underset{0\leq \ell \leq s-1}{\sum}\quad\underset{a,b\in [n]}{\sum} \quad \underset{W\in\cW_{a,b,\ell}^{uv}(\overline{G^\circ}) }{\sum} \Abs{\overline{Y}_W}+\Abs{\overline{Y^\circ}_W}\\
			&\leq n\cdot \underset{0\leq \ell \leq s-1}{\sum}\quad\underset{a,b\in [n]}{\sum} \quad \underset{W\in\cW_{a,b,\ell}^{uv}(\overline{G^\circ}) }{\sum} \Paren{\frac{d}{n}}^{\ell+1}+\Paren{\frac{d}{n}}^{\ell}\\
			&= \Paren{1+o(1)}n\cdot \underset{0\leq \ell \leq s-1}{\sum}\quad\underset{a,b\in [n]}{\sum} \quad \underset{W\in\cW_{a,b,\ell}^{uv}(\overline{G^\circ}) }{\sum}\Paren{\frac{d}{n}}^{\ell}\,.
		\end{align*}
		For any $0\leq \ell \leq s-1$ we have $\underset{a,b \in [n]}{\bigcup}\cW_{a,b,\ell}^{uv}(\overline{G^\circ})\leq s\cdot \degbound^{s-\ell-1}\cdot n^{\ell}$.
		It follows that 
		\begin{align*}
			\Normo{\Q(\overline{Y})-\Q(\overline{Y^\circ})}&\leq n\cdot  \degbound^{s}\,.
		\end{align*}
	\end{proof}
\end{lemma}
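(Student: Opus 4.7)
The plan is to observe that $\overline{Y}$ and $\overline{Y^\circ}$ agree on every entry except $(u,v)$ (and the symmetric entry $(v,u)$), since by hypothesis $uv$ is the only edge that differs between $\overline{G}$ and $\overline{G^\circ}$. Because every $\Q(\cdot)_{ab}$ is a sum over \emph{self-avoiding} walks of length $s$, each edge of such a walk appears with multiplicity one, so the summand $\overline{Y}_W-\overline{Y^\circ}_W$ for a walk $W\in\saw{ab}{s}$ vanishes unless $uv\in E(W)$. This immediately localises the problem to walks that traverse $uv$.

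First I would use $\Card{\saw{ab}{s}}=(n-2)^{\underline{s-1}}\asymp n^{s-1}$ together with the explicit prefactor $\Paren{\frac{2n}{\eps d}}^{s}$ of $\Q$ to bound
$$\Normo{\Q(\overline{Y})-\Q(\overline{Y^\circ})}\leq O(n)\sum_{a,b\in[n]}\sum_{\substack{W\in\saw{ab}{s}\\ uv\in E(W)}}\Paren{|\overline{Y}_W|+|\overline{Y^\circ}_W|}.$$
Then, stratifying the walks containing $uv$ by $\ell=|E(W)\setminus E(\overline{G^\circ})|$, each summand is bounded by $(d/n)^{\ell}$ (up to the harmless $d/n$ factor for $\overline{Y}$ coming from the fact that $uv\notin E(\overline{G})$): the $\ell$ edges outside $\overline{G^\circ}$ each contribute $d/n$ in absolute value, while the remaining edges of $\overline{G^\circ}$ contribute at most $1$.

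The main combinatorial step is to count, for each $\ell\in\{0,\ldots,s-1\}$, the total number of length-$s$ self-avoiding walks in $K_n$ (summed over all endpoint pairs) that contain $uv$ and use exactly $\ell$ non-$\overline{G^\circ}$ edges. I would pick the position of $uv$ along the walk (at most $s$ choices) and then extend from $u$ and $v$ in the two natural directions: each step that follows an edge of $\overline{G^\circ}$ offers at most $\Delta$ extensions, thanks to the degree assumption, while each non-$\overline{G^\circ}$ step offers at most $n$ extensions. This yields an upper bound of the order $s\cdot \Delta^{s-1-\ell}\cdot n^{\ell}$, where the combinatorial factor $\binom{s-1}{\ell}$ for choosing which steps are non-edges is safely absorbed into the enlargement of $\Delta$.

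Combining everything gives
$$\Normo{\Q(\overline{Y})-\Q(\overline{Y^\circ})}\leq O(n)\sum_{\ell=0}^{s-1}s\cdot\Delta^{s-1-\ell}\cdot n^{\ell}\cdot\Paren{d/n}^{\ell}\leq O\!\Paren{n\cdot s^{2}\cdot \Delta^{s-1}}\leq n\cdot \Delta^{s},$$
using $d\leq \Delta$ and that $s^{2}\leq \Delta$ by the choice of $\Delta$ in \cref{eq:Delta-form}. The main (and essentially only) subtle point I foresee is the bookkeeping of the degree-based branching when extending walks in both directions from $uv$, in particular making sure the count is truly in $K_n$ rather than in $\overline{G^\circ}$ and that the type of each step (edge vs.\ non-edge) is tracked consistently; everything else reduces to a routine geometric sum.
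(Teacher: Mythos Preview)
Your proposal is correct and follows essentially the same approach as the paper: localise to self-avoiding walks through $uv$, stratify by the number $\ell$ of non-$\overline{G^\circ}$ edges, bound each summand by $(d/n)^{\ell}$, and count such walks by fixing the position of $uv$ and extending using the degree bound to get $s\cdot\Delta^{s-1-\ell}\cdot n^{\ell}$. You are in fact slightly more careful than the paper in noting the $\binom{s-1}{\ell}$ factor for choosing which steps are non-edges (the paper's count silently omits it), and your absorption of this and the remaining $s$-factors via $s^{2}\le\Delta$ and $d\le\Delta$ is exactly how the final bound $n\cdot\Delta^{s}$ is obtained.
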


Now we can we show that $\mathbf{Q}$ and $\tilde{Q}$ satisfy the first perturbation constraint.
\begin{lemma}\label{lem:sbm-perturbation-constraint}
	Consider the settings of \cref{thm:sbm-recovery-technical-constant-probability}. Suppose \cref{eq:event-bound-trace} holds. Then
	\begin{align*}
		\frac{1}{n}\Normo{\mathbf{Q}-\tilde{Q}}\leq 2\Delta^{s+1}\cdot \rho\,.
	\end{align*}
	\begin{proof}
		By \cref{fact:sbm-recovery-sequence-of-edits} there is a sequence $\Set{\overline{G}_i}$ of $2\rho\cdot \Delta\cdot n$ graphs with maximum degree $\Delta$ such that  $\overline{G}_{2\rho\cdot \Delta\cdot n}=\overline{G^\circ}$,   $\overline{G}_1=\overline{\mathbf{G}}$ and for any $i\in [\rho\cdot \Delta\cdot n-1]$, $\overline{G}_i$ and $\overline{G}_{i+1}$ differ by at most one edge. For each $\overline{G}_i$ let $Y_i$ be its centered adjacency matrix.
		Then
		\begin{align*}
			\Normo{\Q(\mathbf{Y}_1)-\Q(\mathbf{Y}_{2\rho\cdot \Delta\cdot n})} &\leq \underset{i \in [2\rho\cdot \Delta\cdot n-1]}{\sum} \Normo{\Q(Y_i)-\Q(Y_{i+1})}\\
			&\leq 2\rho\cdot   n^2\cdot \Delta^{s+1}\,,
		\end{align*}
		where we used \cref{lem:sbm-recovery-distance-after-one-edge-edit} in the last step.
		Since 
		\[\Normo{\Q(\overline{Y}(\mathbf{G}))-\Q(\overline{Y}(G^\circ))}=\Normo{ \Q(Y(\overline{\mathbf{G}})) -\Q(Y(\overline{G^\circ}))}\,,\]
		 rescaling the lemma follows.
	\end{proof}
\end{lemma}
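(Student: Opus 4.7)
The plan is to bound the unnormalized difference $\Normo{\Q(\overline{\mathbf{Y}})-\Q(\overline{Y^\circ})}$ first, and then use \cref{eq:event-bound-trace} together with \cref{eq:event-bound-expectation-trace} to convert this into the claimed bound on $\tfrac{1}{n}\Normo{\mathbf{Q}-\tilde{Q}}$ by dividing through by a normalization of order at least $n/2$.

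For the unnormalized bound, I would interpolate between $\overline{\mathbf{G}}$ and $\overline{G^\circ}$ by a sequence of single-edge edits. By \cref{fact:sbm-recovery-sequence-of-edits}, the two truncated graphs differ in at most $K\leq 2\rho n\Delta$ edges; the extra $\Delta$ factor over $\rho n$ comes from the truncation, since a single corruption of the original graph can push a vertex's degree past the threshold and thereby erase up to $\Delta$ incident edges in the truncated graph. I would order the edits so that all deletions come first and all insertions come second, which ensures that every intermediate graph has maximum degree $\leq\Delta$ (deletions only decrease degrees, and additions end at a graph of degree $\leq\Delta$, so no intermediate vertex can exceed $\Delta$ either). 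This is precisely the hypothesis needed for \cref{lem:sbm-recovery-distance-after-one-edge-edit}. Applying the triangle inequality for $\Normo{\cdot}$ along the sequence and invoking \cref{lem:sbm-recovery-distance-after-one-edge-edit} at each step yields
\[
\Normo{\Q(\overline{\mathbf{Y}})-\Q(\overline{Y^\circ})} \;\leq\; \sum_{i=0}^{K-1}\Normo{\Q(Y_i)-\Q(Y_{i+1})} \;\leq\; K\cdot n\cdot\Delta^{s}\;\leq\;2\rho\cdot n^2\cdot\Delta^{s+1}.
\]

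To finish, I would divide by $n$ and by the normalization. Since $\mathbf{Q}$ and $\tilde{Q}$ use slightly different denominators (the empirical $(\Tr \Q(\overline{\mathbf{Y}})^{2t})^{1/2t}$ versus its expectation), I would split $\mathbf{Q}-\tilde{Q}$ as the contribution from changing numerators over a common denominator plus a correction for the mismatched denominators; under \cref{eq:event-bound-trace} the two denominators agree up to a $(1\pm o(1))$ factor, so the correction is negligible, and \cref{eq:event-bound-expectation-trace} gives that the common denominator is at least $n/2$. I do not expect a serious obstacle here: the two substantive ingredients, the truncation-aware edge count of \cref{fact:sbm-recovery-sequence-of-edits} and the single-edit estimate of \cref{lem:sbm-recovery-distance-after-one-edge-edit}, already do the real work, and what remains is routine bookkeeping to absorb constants and lower-order terms into the stated factor $2\Delta^{s+1}\rho$.
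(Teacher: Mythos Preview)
Your proposal is correct and follows essentially the same approach as the paper: interpolate between $\overline{\mathbf{G}}$ and $\overline{G^\circ}$ via a sequence of single-edge edits (\cref{fact:sbm-recovery-sequence-of-edits}), apply the triangle inequality and \cref{lem:sbm-recovery-distance-after-one-edge-edit} at each step, and then rescale. In fact you are more careful than the paper on two points it glosses over: you explain why the intermediate graphs can be taken to have maximum degree $\leq\Delta$ (deletions first, then insertions), and you explicitly handle the fact that $\mathbf{Q}$ and $\tilde{Q}$ use different normalizing denominators, whereas the paper simply writes ``rescaling the lemma follows.''
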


\paragraph{Putting things together}  We are ready to prove \cref{lem:sbm-recovery-appropriate-parameters}.

\begin{proof}[Proof of \cref{lem:sbm-recovery-appropriate-parameters}.]
	We condition our analysis on the event that \cref{eq:event-bound-trace}, \cref{eq:event-bound-centered-trace} and \cref{eq:event-bound-entries} are verified, which by \cref{lem:every-little-thing-is-gonna-be-alright} happen with probability $0.99$.
	Consider the tuple $\Paren{\mathbf{Q}, \tilde{Q}, \overline{\mathbf{x}}}$
	By \cref{lem:sbm-correlation-constraint} the correlation constraints are satisfied for $\delta^*=\delta^{O(1)}$ and some $0<\alpha\leq 2$. By \cref{lem:sbm-truncation-bound-l1-norm} it holds that
	\begin{align*}
		\max_{i \in [n]}\underset{j \in [n]}{\sum}\Abs{\mathbf{Q}_i}&\leq \zeta  \\
		\max_{i \in [n]}\underset{j \in [n]}{\sum}\Abs{\tilde{Q}_i}&\leq \zeta\,,
	\end{align*}
	for any $\zeta \geq 2\degbound^{s+1}$.
	By \cref{lem:sbm-perturbation-constraint}, the tuple satisfies the remaining perturbation constraint for $\beta=2\Delta^{s+1}\cdot \rho$. 
	The result follows.
\end{proof}

\subsection{Boosting the proability of success}\label{sec:boosting}

In this section we conclude the proof of \cref{thm:sbm-recovery-technical} showing how to increase the probability of success of \cref{alg:sbm-robust-recovery}.

\newcommand{\boundonkappa}{\delta^{O(1)}}
\newcommand{\boundonrho}{\Paren{\frac{1}{\delta}\cdot \log \frac{1}{\eps}}^{-O(1/\delta)}}
\newcommand{\boundons}{\valueofs}
\newcommand{\expressionofp}{0.99}

\begin{definition}
\label{def:def_success_event}
Let $M$ be a function that takes as input a graph $G^{\circ}$ having $[n]$ as the set of vertices, and produces an $n\times n$ matrix $M(G^{\circ})$ as output. Let $(\mathbf{G},\mathbf{x})\sim \sbm$, and define $\mathcal{E}_{n,d,\epsilon,M,\rho,\kappa}^{\text{succ}}$ to be the event that the function $M$ succeeds, up to robustness $\rho$, in providing an output that $\kappa$-correlates with the community labels.

More precisely, if $(\mathbf{G},\mathbf{x})\sim \sbm$, then $\mathcal{E}_{n,d,\epsilon,M,\rho,\kappa}^{\text{succ}}$ is the event that for every graph $G^{\circ}$ that differs from $\mathbf{G}$ by at most $\rho n$ edges, we have
$$\iprod{M(G^{\circ}), \mathbf{x}\mathbf{x}^T}\geq \kappa\cdot\Norm{\mathbf{x}}^2\cdot\Normn{M(G^{\circ})}.$$
\end{definition}

\cref{thm:sbm-recovery-technical-constant-probability} implies that if $M$ is \cref{alg:sbm-robust-recovery} with 
\[
s\geq \boundons\,,
\] then if $d\geq (1+\delta)\frac{4}{\eps^2}$ for some constant $\delta> 0$, and if $\rho\leq  \boundonrho$ and $\kappa\leq  \boundonkappa$, then for $n$ large enough, we have
\begin{align*}
\mathbb{P}\big[\mathcal{E}_{n,d,\epsilon,M,\rho,\kappa}^{\text{succ}}\big]\geq \expressionofp.
\end{align*}

In this section we will prove that, at the expense of paying a negligible price in the robustness and correlation guarantees, we can boost the success probability and make it converge to 1 at a rate that is roughly exponential in $n^{\frac{1}{3}}$. In fact, we will prove a very general boosting result:

\begin{theorem}
\label{thm:thm-boosting}
Let $\epsilon,d$ be such that $d\geq (1+\delta)\frac{4}{\eps^2}$ for some constant $\delta> 0$. Let $M$ be an arbitrary algorithm that takes as input a graph with $[n]$ as the set of vertices, and produces an $n\times n$ matrix as output. For every $0<\rho'<\rho$ and every $0<\kappa'<\kappa$, if
$$\mathbb{P}\big[\mathcal{E}_{n,d,\epsilon,M,\rho,\kappa}^{\text{succ}}\big]\geq \Omega\Paren{e^{-n^{\frac{1}{4}}}},$$
then for $n$ is large enough, we have
$$\mathbb{P}\big[\mathcal{E}_{n,d,\epsilon,M,\rho',\kappa'}^{\text{succ}}\big]\geq 1-e^{-n^{\frac{1}{4}}}.$$
The theorem remains correct if we replace the exponent $\frac{1}{4}$ with any constant $\xi<\frac{1}{3}$.
\end{theorem}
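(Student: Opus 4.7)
The plan is to reduce the boost to an application of the blowing-up lemma for product measures, applied to the success event viewed as an event in an appropriate product space. Working conditionally on $\mathbf{x}=x$, the random graph $\mathbf{G}$ is a product of $\binom{n}{2}$ independent Bernoulli$(p_{ij})$ variables with $p_{ij}=(1+\tfrac{\eps}{2}x_ix_j)\tfrac{d}{n}$; let $\mu_x$ denote this conditional distribution, and define the conditional success event
\[
A_x = \Set{G \suchthat \forall G^\circ\in N_\rho(G),\; \iprod{M(G^\circ),x\transpose{x}} \ge \kappa\,\Snorm{x}\,\Normn{M(G^\circ)}}.
\]
Define $A_x^{\rho',\kappa'}$ analogously with parameters $(\rho',\kappa')$. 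The goal is to show $\mu_x(A_x^{\rho',\kappa'})\ge 1-e^{-n^{1/4}}$ for a high-probability set of labels $x$.

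The first step is a geometric containment: for $\ell=\Theta((\rho-\rho')dn)$, the Hamming $\ell$-neighborhood $(A_x)_\ell$ is contained in $A_x^{\rho',\kappa'}$, modulo the $e^{-\Omega(n)}$-probability event that $\mathbf{G}$ has atypical edge count. Indeed, if $G$ is within $\ell$ edge edits of some $\tilde G\in A_x$ and both have edge count close to the typical value $dn/2$, then for every $G^\circ\in N_{\rho'}(G)$ the triangle inequality for edge symmetric-differences yields $\Card{E(G^\circ)\varsymdiff E(\tilde G)}\le \ell+\rho'(|E(G)|+|E(G^\circ)|)\le \rho(|E(\tilde G)|+|E(G^\circ)|)$, so $G^\circ\in N_\rho(\tilde G)$. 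Then $M(G^\circ)$ achieves $\kappa$-correlation (and hence $\kappa'$-correlation, since $\kappa'<\kappa$) with $x\transpose{x}$, witnessing $G\in A_x^{\rho',\kappa'}$.

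The second step is to combine the blowing-up lemma applied to $\mu_x$ with the marginal hypothesis. The main obstacle is here. The classical Marton transportation inequality, calibrated to the ambient dimension $N=\binom{n}{2}$, can boost a set of probability $e^{-n^{1/4}}$ to probability $1-o(1)$ only via a blowup of size $\Omega(n^{9/8})$, overshooting our linear-in-$n$ budget. One must exploit the sparsity $p_{ij}=\Theta(d/n)$ through a sharper transportation or log-Sobolev / Talagrand convex-distance inequality in the sparse product-Bernoulli regime, so that a Hamming blowup of size $\Theta(n)$ suffices to achieve the desired $e^{-n^{1/4}}$ error. Furthermore, since the hypothesis is only a marginal statement, one needs a mechanism to reach most $\mathbf{x}$'s: the natural approach is to pass to the coupled product measure on $(\mathbf{x},\mathbf{U})\in\sbits^n\times[0,1]^{\binom{n}{2}}$ with $\mathbf{G}_{ij}=\mathbf{1}(\mathbf{U}_{ij}<p_{ij}(\mathbf{x}))$, so that flipping $o(n)$ label coordinates induces only $O(\eps d)\cdot o(n)=o(n)$ edge changes (well within the robustness budget) and degrades the correlation guarantee by only $o(1)$ (hence still exceeds $\kappa'$). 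Once the sharper blowing-up lemma is in place for this coupled product measure, pulling back the blown-up success event to $(\mathbf{x},\mathbf{G})$-space yields the target bound. The restriction $\xi<\tfrac13$ in the statement is the precise threshold past which the sparse-Bernoulli form of the blowing-up lemma ceases to deliver a useful exponential boost with a linear-in-$n$ blowup.
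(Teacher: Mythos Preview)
You have correctly identified the central obstacle: applying the blowing-up lemma directly to the $\binom{n}{2}$ edge indicators requires a Hamming blowup of order $n^{9/8}$, far exceeding the $\Theta(n)$ robustness budget. However, your resolution of this obstacle is not a proof---it is a hope. You appeal to an unspecified ``sharper transportation or log-Sobolev / Talagrand convex-distance inequality in the sparse product-Bernoulli regime'' without stating such an inequality, citing it, or verifying that it yields the required bound. This is the missing idea, and without it the argument does not close.

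The paper resolves the obstacle by a completely different and elementary device. It \emph{bucketizes} the $\binom{n}{2}$ potential edges into $n$ groups $B_1,\ldots,B_n$ of size $\Theta(n)$ each, and encodes $\mathbf{G}$ by the $n$ random variables $\mathbf{z}_i=\{e\in B_i:e\in\mathbf{G}\}$. Given $\mathbf{x}$, the $\mathbf{z}_i$ are independent, so the \emph{standard} blowing-up lemma applies with $N=n$ rather than $N=\binom{n}{2}$. The key point is that, by a Chernoff bound, each $|\mathbf{z}_i|\le n^{3/10}$ except with probability $o(e^{-n^{1/4}})$; hence a Hamming blowup of size $\ell=n^{2/3}$ in the $\mathbf{z}$-space corresponds to only $O(\ell\cdot n^{3/10})=o(n)$ edge edits in the graph, well within the robustness slack $\rho-\rho'$. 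This yields the conditional boost (given $\mathbf{x}$). The threshold $\xi<\tfrac13$ arises precisely from balancing the bucket-size tail exponent against the blowing-up exponent in this scheme; your explanation of the threshold is not grounded in any stated inequality.

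For the unconditional step, the paper does not use a coupled product measure on $(\mathbf{x},\mathbf{U})$. Instead it first uses Markov to pass from $\mathbb{P}[\mathcal{E}^{\text{succ}}]\ge p$ to $\mathbb{P}[\,\mathbb{P}[\mathcal{E}^{\text{succ}}\mid\mathbf{x}]>p/2\,]>p/2$, then applies blowing-up a \emph{second} time to the $n$ independent label coordinates $\mathbf{x}_1,\ldots,\mathbf{x}_n$, again with $\ell=n^{2/3}$. For $\tilde{\mathbf{x}}$ at Hamming distance $\le\ell$ from $\mathbf{x}$, a coupling of $\tilde{\mathbf{G}}$ and $\mathbf{G}$ (regenerating only the edges touching the flipped coordinates) combined with a degree bound shows $\tilde{\mathbf{G}}$ and $\mathbf{G}$ differ by $o(n)$ edges, and $\|\dyad{\mathbf{x}}-\dyad{\tilde{\mathbf{x}}}\|=o(n)$ absorbs the drop from $\kappa$ to $\kappa'$. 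Your sketch of this part via $(\mathbf{x},\mathbf{U})$ is plausible in spirit but, as written, is again not carried through.
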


It is worth noting that this boosting argument is not unique to the stochastic block model, and similar results hold in a wide range of estimation problems.

Combining \cref{thm:thm-boosting} and \cref{thm:sbm-recovery-technical-constant-probability},
 we obtain \cref{thm:sbm-recovery-technical}.

In order to prove \cref{thm:thm-boosting}, we will use a concentration of measure inequality known as the \emph{blowing-up lemma}, and which is widely used in information theory to prove strong converse results. In order to describe this lemma, we need the following definition:

\begin{definition}
Let $\mathcal{Y}$ be an arbitrary finite set, and let $y=(y_1,\ldots,y_N)\in\mathcal{Y}^N$ and $y'=(y_1',\ldots,y_N')\in\mathcal{Y}^N$. The \emph{Hamming distance} between $y$ and $y'$ is defined as
$$D_H(y,y')=\big|\big\{i\in[N]:\;y_i\neq y_i'\big\}\big|.$$

If $\mathcal{E}$ is a subset of $\mathcal{Y}^N$, we define the \emph{$\ell$-blowup} of $\mathcal{E}$ as:
$$\Gamma^\ell (\mathcal{E})=\big\{y'\in\mathcal{Y}^n:\;\exists y\in \mathcal{E}, D_H(y,y')\leq\ell\big\}.$$

In other words, the $\ell$-blowup of $\mathcal{E}$ is the set of elements of $\mathcal{Y}^n$ that are at a Hamming distance of at most $\ell$ from $\mathcal{E}$.
\end{definition}

Roughly speaking, the blowing-up lemma states that if we have $n$ independent random variables $\mathbf{y}_1,\ldots,\mathbf{y}_N$ taking values in a finite set $\mathcal{Y}$, and if $\mathcal{E}\subseteq\mathcal{Y}^n$ is an event whose probability is not too small, then if we "inflate" $\mathcal{E}$ a little by adding the elements of $\mathcal{Y}^N$ that are close to $\mathcal{E}$ in Hamming distance, then the probability of the event becomes $1-o(1)$.

The blowing-up lemma was first introduced in \cite{AhlswedeEtAlBlowingUp}, and there are several versions of it (e.g., \cite{MartonBlowingUp1} and \cite{MartonBlowingUp2}). We will use the following version that was proved by Marton in \cite{MartonBlowingUp2} (see also Lemma 3.6.2 in \cite{ConcentrationOfMeasureRaginskySason}):

\begin{lemma}
\label{lem:lem_blowing_up}
[Blowing up Lemma \cite{MartonBlowingUp2}] Let $\mathbf{y}_1,\ldots,\mathbf{y}_N$ be $N$ random variables taking values in the same finite set $\mathcal{Y}$ that can be of arbitrary size. If $\mathbf{y}_1,\ldots,\mathbf{y}_N$ are independent (but not necessarily identically distributed), then for every $\mathcal{E}\subseteq\mathcal{Y}^N$ and every $\ell> \sqrt{\frac{N}{2}\log \Paren{\frac{1}{\mathbb{P} [ \mathcal{E}]}}}$, we have
\begin{align*}
\mathbb{P}[\Gamma^\ell(\mathcal{E})]&\geq 1-\exp\left[-\frac{2}{N}\left(\ell-\sqrt{\frac{N}{2}\log \Paren{\frac{1}{\mathbb{P} [ \mathcal{E}]}}}\right)^2\right]\\
&= 1-\exp\left[-2N\left(\frac{\ell}{N}-\sqrt{\frac{1}{2N}\log \Paren{\frac{1}{\mathbb{P} [ \mathcal{E}]}}}\right)^2\right],
\end{align*}
where $\mathbb{P}[\Gamma^\ell(\mathcal{E})]=\mathbb{P}\big[(\mathbf{y}_1,\ldots,\mathbf{y}_N)\in\Gamma^\ell(\mathcal{E})\big]$ and $\mathbb{P}[ \mathcal{E}]=\mathbb{P} \big[(\mathbf{y}_1,\ldots,\mathbf{y}_N)\in \mathcal{E}\big]$.
\end{lemma}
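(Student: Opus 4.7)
The plan is to derive the blowing-up lemma from Marton's transportation-cost inequality for product measures, combined with a triangle-inequality argument applied to two carefully chosen conditional laws.

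First, I would establish the key tool: Marton's transportation inequality. If $P = P_1 \otimes \cdots \otimes P_N$ is a product measure on $\mathcal{Y}^N$ and $Q$ is absolutely continuous with respect to $P$, then the $L^1$-Wasserstein distance with Hamming cost satisfies
\begin{align*}
W_1(Q, P) := \inf_{\pi \in \Pi(Q,P)} \mathbb{E}_\pi[D_H(X,Y)] \;\leq\; \sqrt{\tfrac{N}{2}\, D(Q \| P)},
\end{align*}
where $D(\cdot \| \cdot)$ denotes Kullback--Leibler divergence. I would prove this by induction on $N$. The base case $N=1$ combines Pinsker's inequality $\|\mu - \nu\|_{\mathrm{TV}} \leq \sqrt{(1/2) D(\nu \| \mu)}$ with the identity $W_1(\mu, \nu) = \|\mu - \nu\|_{\mathrm{TV}}$, which holds because the Hamming distance on a single coordinate is the $0/1$ metric. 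For the inductive step, factorize $Q(y_1,\ldots,y_N) = Q_1(y_1) Q(y_2,\ldots,y_N \mid y_1)$, apply the chain rule $D(Q \| P) = D(Q_1 \| P_1) + \mathbb{E}_{y_1 \sim Q_1}[D(Q(\cdot \mid y_1) \| P_2 \otimes \cdots \otimes P_N)]$, build a coupling by composing the base-case coupling on the first coordinate with the inductive-hypothesis coupling on the remaining coordinates (conditionally on the first), and combine the two contributions via Cauchy--Schwarz.

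Second, with this tool in hand, I apply it to two specific conditional laws. Let $\mathcal{F} := \mathcal{Y}^N \setminus \Gamma^\ell(\mathcal{E})$, and consider $Q_{\mathcal{E}} := P(\cdot \mid \mathcal{E})$ and $Q_{\mathcal{F}} := P(\cdot \mid \mathcal{F})$. The elementary identity $D(P(\cdot \mid A) \| P) = \log(1/\mathbb{P}[A])$ combined with the transportation inequality gives
\begin{align*}
W_1(Q_{\mathcal{E}}, P) \leq \sqrt{\tfrac{N}{2}\log\tfrac{1}{\mathbb{P}[\mathcal{E}]}}, \qquad W_1(Q_{\mathcal{F}}, P) \leq \sqrt{\tfrac{N}{2}\log\tfrac{1}{\mathbb{P}[\mathcal{F}]}}.
\end{align*}
The triangle inequality for $W_1$ then yields
\begin{align*}
W_1(Q_{\mathcal{E}}, Q_{\mathcal{F}}) \;\leq\; \sqrt{\tfrac{N}{2}\log\tfrac{1}{\mathbb{P}[\mathcal{E}]}} + \sqrt{\tfrac{N}{2}\log\tfrac{1}{\mathbb{P}[\mathcal{F}]}}.
\end{align*}

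Third, I lower-bound the left-hand side geometrically. By definition of the $\ell$-blowup, every $y \in \mathcal{E}$ and $y' \in \mathcal{F}$ satisfy $D_H(y,y') > \ell$: otherwise $y'$ would be within Hamming distance $\ell$ of a point of $\mathcal{E}$, hence in $\Gamma^\ell(\mathcal{E})$, contradicting $y' \in \mathcal{F}$. Consequently any coupling of $Q_{\mathcal{E}}$ and $Q_{\mathcal{F}}$ has expected Hamming cost at least $\ell$, so $W_1(Q_{\mathcal{E}}, Q_{\mathcal{F}}) \geq \ell$. The hypothesis $\ell > \sqrt{(N/2)\log(1/\mathbb{P}[\mathcal{E}])}$ makes the residual quantity $\ell - \sqrt{(N/2)\log(1/\mathbb{P}[\mathcal{E}])}$ strictly positive, and rearranging and squaring gives
\begin{align*}
\mathbb{P}[\mathcal{F}] \;\leq\; \exp\Paren{-\tfrac{2}{N}\Paren{\ell - \sqrt{\tfrac{N}{2}\log\tfrac{1}{\mathbb{P}[\mathcal{E}]}}}^{2}},
\end{align*}
which is the stated lower bound on $\mathbb{P}[\Gamma^\ell(\mathcal{E})] = 1 - \mathbb{P}[\mathcal{F}]$. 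The main technical obstacle is the tensorization induction establishing Marton's transportation inequality; the remainder is the short triangle-inequality chain outlined above.
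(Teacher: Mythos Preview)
The paper does not prove this lemma; it is stated with attribution to Marton \cite{MartonBlowingUp2} (with a pointer also to Raginsky--Sason) and used as a black box. Your proposal reproduces exactly Marton's transportation-cost argument that the citation refers to: establish the tensorized Pinsker-type bound $W_1(Q,P)\le\sqrt{(N/2)D(Q\|P)}$ for product $P$, apply it to the two conditionals $P(\cdot\mid\mathcal{E})$ and $P(\cdot\mid\mathcal{Y}^N\setminus\Gamma^\ell(\mathcal{E}))$, combine via the triangle inequality for $W_1$, and lower-bound $W_1$ between the conditionals by $\ell$ since the supports are Hamming-separated. The argument is correct and matches the intended reference; there is nothing to compare against in the paper itself.
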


\begin{remark}
\label{rem:rem_blowing_up}
As can be easily seen from the lemma, if $(\mathbf{y}_N)_{N\geq 1}$ is a sequence of independent (but not necessarily identically distributed) random variables taking values in $\mathcal{Y}$, and if $(\mathcal{E}_N)_{N\geq 1}$ is a sequence of events such that $\mathcal{E}_N\subseteq\mathcal{Y}^N$ and  $\mathbb{P}[\mathcal{E}_N]$ is not exponentially small in the sense that $\displaystyle\lim_{N\rightarrow\infty}\frac{1}{N}\log\mathbb{P}[\mathcal{E}_N]=0$, then:
\begin{itemize}
\item We can find a sequence of integers $(\ell_N)_{N\geq 1}$ such that $\displaystyle\lim_{N\rightarrow\infty}\frac{\ell_N}{N}=0$ and $\displaystyle\lim_{N\rightarrow\infty}\mathbb{P}[\Gamma^{\ell_N}(\mathcal{E}_N)]=1$.
\item We need $\ell_N=\omega(\sqrt{N})$ in order for the lemma to guarantee that $\displaystyle\lim_{N\rightarrow\infty}\mathbb{P}[\Gamma^{\ell_N}(\mathcal{E}_N)]=1$.
\end{itemize}
\end{remark}

In the following, we will show how we can apply the blowing-up lemma to boost the success probability of a weak-recovery algorithm and make it converge to 1 at a rate that is exponential in $n^{\frac{1}{4}}$. We will do this in two steps. First, we show how to boost the conditional probability of success given the community labels $\mathbf{x}$, and then we show how to boost the success probability unconditionally.

\subsubsection{Boosting the conditional probability of success given the community labels}

\label{subsec:subsec_boost_cond_prob}

Since our algorithm is robust against a linear number of adversarial edge changes, namely $\rho n$ changes, we can benefit from the blowing-up lemma to boost the success probability to $1-e^{-n^{\frac{1}{4}}}$ by paying a negligible price in the robustness of the algorithm. However, we need to be careful how we apply the blowing-up lemma, because in $(\mathbf{G},\mathbf{x})\sim\sbm$, we have $N=\frac{n(n-1)}{2}=\Omega(n^2)$ (conditionally\footnote{The edges are conditionally independent given $\mathbf{x}$.}) independent random edges that may or may not be present in the graph, and our algorithm is only robust against up to $\rho n=\Theta(\sqrt{N})$ adversarial changes, whereas the naive and straightforward application of the blowing-up lemma needs a robustness of at least $\omega\left(\sqrt{N}\right)$ in order to guarantee the convergence of probability to 1.

In order to successfully apply the blowing-up lemma, we will faithfully reorganize the randomness of $(\mathbf{G},\mathbf{x})\sim\sbm$ in $n$ (conditionally) independent random variables. If the new representation also faithfully captures closeness in the sense that representations that are at Hamming distance $\ell$ induce graphs that differ by at most $\ell\cdot o(\sqrt{n})$ edges, then our algorithm is robust against up to $\frac{\rho n}{o(\sqrt{n})}=\omega(\sqrt{n})$ adversarial changes in the random variables, and this will allow us to successfully apply the blowing-up lemma.

\begin{lemma}
\label{lem:lem-boosting-conditional}
Let $\epsilon,d,M,\rho$ and $\kappa$ be as in \cref{thm:thm-boosting}. For every $\rho'<\rho$, if
$$\mathbb{P}\big[\mathcal{E}_{n,d,\epsilon,M,\rho,\kappa}^{\text{succ}}\big|\mathbf{x}\big]\geq \Omega\Paren{e^{-n^{\frac{1}{4}}}},$$
then for $n$ large enough, we have
$$\mathbb{P}\big[\mathcal{E}_{n,d,\epsilon,M,\rho',\kappa}^{\text{succ}}\big|\mathbf{x}\big]\geq 1-\frac{e^{-n^{\frac{1}{4}}}}{2}.$$
\end{lemma}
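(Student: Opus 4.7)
The plan is to reorganize the randomness of $\mathbf{G}\sim\sbm$ given $\mathbf{x}$ into $n$ conditionally independent random variables and then apply Lemma~\ref{lem:lem_blowing_up}. For each $i\in[n]$, let $\mathbf{y}_i\subseteq\{i+1,\dots,n\}$ record the forward neighbors of vertex $i$ in $\mathbf{G}$. Because the edges of $\mathbf{G}$ are conditionally independent given $\mathbf{x}$ and each potential edge $\{i,j\}$ is encoded by a single coordinate (the one of smaller index), the tuple $(\mathbf{y}_1,\dots,\mathbf{y}_n)$ is a product of independent random variables that deterministically reconstructs $\mathbf{G}$. After embedding all $\mathbf{y}_i$'s in a common finite set (e.g.\ all subsets of $[n]$ of bounded size), replacing $\mathbf{y}_i$ by $\mathbf{y}_i'$ alters the graph by exactly $|\mathbf{y}_i\triangle\mathbf{y}_i'|$ edges.

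Set $\Delta:=\lceil n^{1/3}\rceil$ and let $\mathcal{F}_\Delta:=\{|\mathbf{y}_i|\le\Delta\text{ for all }i\}$. Each $|\mathbf{y}_i|$ is binomial with mean $O(d)=O(1)$, so by Chernoff and a union bound $\mathbb{P}[\mathcal{F}_\Delta^c\mid\mathbf{x}]\le n\cdot e^{-\Omega(\Delta)}=e^{-\Omega(n^{1/3})}$, which is negligible compared to the hypothesized $\mathbb{P}[\mathcal{E}_{n,d,\epsilon,M,\rho,\kappa}^{\text{succ}}\mid\mathbf{x}]\ge\Omega(e^{-n^{1/4}})$. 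Hence $\mathcal{E}':=\mathcal{E}_{n,d,\epsilon,M,\rho,\kappa}^{\text{succ}}\cap\mathcal{F}_\Delta$ still satisfies $\mathbb{P}[\mathcal{E}'\mid\mathbf{x}]\ge\Omega(e^{-n^{1/4}})$. Since $\mathcal{F}_\Delta$ factorizes over the coordinates, the $\mathbf{y}_i$'s remain independent under $\mathbb{P}[\,\cdot\mid\mathcal{F}_\Delta,\mathbf{x}]$ with supports truncated to configurations of size at most $\Delta$, and any two tuples in this truncated product space that differ in at most $\ell$ coordinates induce graphs differing by at most $2\Delta\ell$ edges.

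Apply Lemma~\ref{lem:lem_blowing_up} to $\mathcal{E}'$ in the truncated product space with block-Hamming radius $\ell:=\lceil n^{0.65}\rceil$ and $N=n$. Since $\log(1/\mathbb{P}[\mathcal{E}'\mid\mathcal{F}_\Delta,\mathbf{x}])=O(n^{1/4})$, we have $\sqrt{\log(1/\mathbb{P}[\mathcal{E}'])/(2N)}=O(n^{-3/8})=o(\ell/N)=o(n^{-0.35})$, and the lemma yields
\begin{align*}
\mathbb{P}\bigl[\Gamma^{\ell}(\mathcal{E}')\,\big|\,\mathcal{F}_\Delta,\mathbf{x}\bigr]\ge 1-\exp\bigl(-\Omega(n^{0.3})\bigr)\ge 1-\tfrac14 e^{-n^{1/4}}
\end{align*}
for $n$ large. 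Any graph $\mathbf{G}'$ whose representation lies in $\Gamma^{\ell}(\mathcal{E}')$ is at edge-edit distance at most $2\Delta\ell=O(n^{0.98})\le(\rho-\rho')n$ from some $\mathbf{G}^{\star}\in\mathcal{E}_{n,d,\epsilon,M,\rho,\kappa}^{\text{succ}}$. The triangle inequality for edge-edit distance then implies that every $G^{\circ}$ within $\rho' n$ edits of $\mathbf{G}'$ lies within $\rho n$ edits of $\mathbf{G}^{\star}$, so $M$ succeeds on $G^{\circ}$; hence $\mathbf{G}'\in\mathcal{E}_{n,d,\epsilon,M,\rho',\kappa}^{\text{succ}}$. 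Combining this containment with the lower bound $\mathbb{P}[\mathcal{F}_\Delta\mid\mathbf{x}]\ge 1-e^{-\Omega(n^{1/3})}$ gives $\mathbb{P}[\mathcal{E}_{n,d,\epsilon,M,\rho',\kappa}^{\text{succ}}\mid\mathbf{x}]\ge 1-\tfrac12 e^{-n^{1/4}}$ for $n$ sufficiently large.

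The only real subtlety, already highlighted in the paragraph preceding the lemma, is the choice of representation. The naive per-edge encoding with $N=\binom n 2$ variables makes $\rho n=\Theta(\sqrt N)$ sit \emph{exactly at} the critical threshold of the blowing-up lemma, giving no useful bound. Grouping edges by vertex brings $N$ down to $n$, while the truncation $\Delta=o(\sqrt n)$ ensures that a single-coordinate change affects only $o(\sqrt n)$ edges; this is precisely what is needed so that the edge-edit budget $(\rho-\rho')n$ translates into a block-Hamming budget $\ell$ strictly exceeding $\sqrt{N\log(1/p)/2}=O(n^{5/8})$. All remaining ingredients---the Chernoff control on degrees, the factorization of $\mathcal{F}_\Delta$, and the triangle inequality for edge edits---are elementary bookkeeping.
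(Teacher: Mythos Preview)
Your proof is correct and follows essentially the same approach as the paper: reorganize the $\Theta(n^2)$ edge indicators into $n$ conditionally independent blocks, truncate to bounded block sizes via a Chernoff bound, and apply the blowing-up lemma in the resulting product space. The only difference is cosmetic---you group edges by their lower-indexed endpoint (forward adjacency lists) whereas the paper uses an arbitrary balanced partition of the edge set into $n$ buckets; both encodings give the required property that a single-coordinate change affects $o(\sqrt{n})$ edges. (One tiny arithmetic slip: $2\Delta\ell = 2n^{1/3+0.65} = 2n^{0.983\ldots}$, not $O(n^{0.98})$, but this is still $o(n)$ so the argument is unaffected.)
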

\begin{proof}
Roughly speaking, our plan is to define $n$ random variables $\mathbf{z}_1,\ldots,\mathbf{z}_{n}$ taking values in a set $\mathcal{Z}$ such that:
\begin{itemize}
\item[(a)] There is a mapping $G$ from $\mathcal{Z}^{n}$ to the set of graphs having $[n]$ as the set of vertices, such that $\mathbf{G}=G(\mathbf{z}_1,\ldots,\mathbf{z}_{n})$.
\item[(b)] For every $z,z'\in\mathcal{Z}^{n}$, if $D_H(z,z')\leq \ell$, then $G(z)$ differs from $G(z')$ by at most $2\ell\cdot n^{\frac{3}{10}}$ edges.
\item[(c)] $\mathbf{z}_1,\ldots,\mathbf{z}_{n}$ are conditionally independent given $\mathbf{x}$.
\end{itemize}
Property (a) means that $\mathbf{z}_1,\ldots,\mathbf{z}_{n}$ form a faithful representation of $\mathbf{G}$. Property (b) means that there the Hamming distance in $\mathcal{Z}^{n}$ is a good estimate for the number of edge changes between the induced graphs. Properties (b) and (c) will allow us to successfully apply the blowing-up lemma.

For every $x\in\{-1,+1\}^n$, define the set
$$\mathcal{Z}_{n,d,\epsilon,M,\rho,\kappa}^{\text{succ}}(x)=\Big\{z\in\mathcal{Z}^n:\; (G(z),x)\in \mathcal{E}_{n,d,\epsilon,M,\rho,\kappa}^{\text{succ}}\Big\}.$$
Clearly, 
$$\mathbb{P}\big[\mathbf{z}\in \mathcal{Z}_{n,d,\epsilon,M,\rho,\kappa}^{\text{succ}}(\mathbf{x})\big|\mathbf{x}\big]=\mathbb{P}[\mathcal{E}_{n,d,\epsilon,M,\rho,\kappa}^{\text{succ}}|\mathbf{x}].$$

If we have $\mathbb{P}[\mathcal{E}_{n,d,\epsilon,M,\rho,\kappa}^{\text{succ}}|\mathbf{x}]\geq \Omega\Paren{e^{-n^{\frac{1}{4}}}}$, then if we take $\ell=\lceil n^{\frac{2}{3}}\rceil$, \cref{lem:lem_blowing_up} implies that
\begin{align*}
\mathbb{P}\left[\mathbf{z}\in\Gamma^{\ell}\left(\mathcal{Z}_{n,d,\epsilon,M,\rho,\kappa}^{\text{succ}}(\mathbf{x})\right)\middle|\mathbf{x}\right]&\geq  1-\exp\left[-2n\left(\frac{n^{\frac{2}{3}}}{n}-\sqrt{\frac{1}{2n}\log \Paren{\frac{1}{\Omega\Paren{e^{-n^{\frac{1}{4}}}}}}}\right)^2\right]\\
&=1-\exp\left[-2n\left(n^{-\frac{1}{3}}-\frac{1}{\sqrt{2}} n^{-\frac{3}{8}}\sqrt{1\pm O\Paren{n^{-\frac{1}{4}}}}\right)^2\right]\geq 1-\exp\left[-n^{\frac{1}{3}}\right],
\end{align*}
where the last inequality is true for $n$ large enough.

Now assume that $(\mathbf{G},\mathbf{x})$ and $\mathbf{z}=(\mathbf{z}_1,\ldots,\mathbf{z}_{n})$ are such that $\mathbf{z}\in\Gamma^{\ell}\left(\mathcal{Z}_{n,d,\epsilon,M,\rho,\kappa}^{\text{succ}}(\mathbf{x})\right)$ and $\mathbf{G}=G(\mathbf{z})$. Let $\tilde{z}=(\tilde{z}_1,\ldots,\tilde{z}_{n})\in \mathcal{Z}_{n,d,\epsilon,M,\rho,\kappa}^{\text{succ}}(\mathbf{x})$ be such that
$$D_H(\mathbf{z},\tilde{z})\leq \ell.$$
From Property (b) we know that $G(\tilde{z})$ differs from $\mathbf{G}=G(\mathbf{z})$ by at most $2\ell\cdot n^{\frac{3}{10}}\leq 2\lceil n^{\frac{2}{3}}\rceil \cdot n^{\frac{3}{10}}=o(n)$ edges. On the other hand, since $G(\tilde{z})\in \mathcal{Z}_{n,d,\epsilon,M,\rho,\kappa}^{\text{succ}}(\mathbf{x})\,$, we have $$(G(\tilde{z}),\mathbf{x})\in \mathcal{E}_{n,d,\epsilon,M,\rho,\kappa}^{\text{succ}}.$$

Let $\rho'<\rho$ and let $G^{\circ}$ be an arbitrary graph with $V(G^{\circ})=[n]$, and which differs from $\mathbf{G}=G(\mathbf{z})$ by at most $\rho' n$ edges. Since $G(\tilde{z})$ differs from $G(\mathbf{z})$ by at most $2\ell n^{\frac{3}{10}}=o(n)$ edges, if $n$ is large enough, the graph $G^{\circ}$ differs from $G(\tilde{z})$ by at most $\rho' n+o(n)\leq \rho n$ edges. Now since $(G(\tilde{z}),\mathbf{x})\in \mathcal{E}_{n,d,\epsilon,M,\rho,\kappa}^{\text{succ}}$ and since $G^{\circ}$ differs from $G(\tilde{z})$ by at most $\rho n$ edges, we have
$$\iprod{M(G^{\circ}), \mathbf{x}\mathbf{x}^T}\geq \kappa\cdot\Snorm{\mathbf{x}}\cdot\Normn{M(G^{\circ})},$$
which implies that $(\mathbf{G},\mathbf{x})=(G(\mathbf{z}),\mathbf{x})\in \mathcal{E}_{n,d,\epsilon,M,\rho',\kappa}^{\text{succ}}$ and $\mathbf{z}\in \mathcal{Z}_{n,d,\epsilon,M,\rho',\kappa}^{\text{succ}}(\mathbf{x})$. Therefore,
$$\Gamma^{\ell}\left(\mathcal{Z}_{n,d,\epsilon,M,\rho,\kappa}^{\text{succ}}(\mathbf{x})\right)\subseteq \mathcal{Z}_{n,d,\epsilon,M,\rho',\kappa}^{\text{succ}}(\mathbf{x}).$$
Now since $\mathbb{P}\big[\mathbf{z}\in \mathcal{Z}_{n,d,\epsilon,M,\rho',\kappa}^{\text{succ}}(\mathbf{x})\big|\mathbf{x}\big]=\mathbb{P}[\mathcal{E}_{n,d,\epsilon,M,\rho',\kappa}^{\text{succ}}|\mathbf{x}]$, we conclude that
\begin{equation}
\label{eq:eq_Prob_success_conditional_on_Y_first_approach}
\mathbb{P}\left[\mathcal{E}_{n,d,\epsilon,M,\rho',\kappa}^{\text{succ}}\middle|\mathbf{x}\right]\geq  1-\exp\left[-n^{\frac{1}{3}}\right].
\end{equation}

In the following, we will show how we can define the random variables $\mathbf{z}_1,\ldots,\mathbf{z}_{n}$ so that Properties (a - c) are (almost) satisfied.

We partition the set of edges $\big\{uv:\; u,v\in [n]\big\}$ into $n$ subsets $B_1,\ldots,B_{n}$ such that each set $B_i$ has size $\left\lceil \frac{n+1}{2}\right\rceil$ or $\left\lfloor \frac{n+1}{2}\right\rfloor$. For each $1\leq i\leq n$, we fix an ordering $e^{(i)}_1,\ldots,e^{(i)}_{|B_i|}$ of the edges in $B_i$, and then define
$$\mathbf{z}_i=\left\{j\in\left\{1,\ldots,|B_i|\right\} :\; e_j^{(i)}\in \mathbf{G}\right\}.$$
Note that $\mathbf{z}_i$ is in one-to-one correspondence with $\{e\in B_i :\; e\in \mathbf{G}\}$, and so  $\mathbf{z}=(\mathbf{z}_1,\ldots,\mathbf{z}_{n})$ is in one-to-one correspondence with $\mathbf{G}$. Furthermore, $\mathbf{z}_1,\ldots,\mathbf{z}_{n}$ take values in the power set of $\left\{1,\ldots,\left\lceil\frac{n+1}{2}\right\rceil\right\}$.

It is easy to see that $\mathbf{z}_1,\ldots,\mathbf{z}_{n}$ satisfy properties (a) and (c) above, but unfortunately they do not satisfy Property (b): It is possible for a change in just one random variable $\mathbf{z}_i$ to cause $\Omega(n)$ edge changes in the graph.

In order to make the above approach work, we need one more ingredient: If we look closely, we find that the main reason why Property (b) does not hold is because it is possible for a graph to contain too many edges coming from one set $B_i$. But since we are working in the sparse regime of stochastic block models, the probability of this happening is negligible. We can leverage this phenomenon in order to make the above approach work.

More precisely, given $\mathbf{x}$, the size $|\mathbf{z}_i|$ of the set $\mathbf{z}_i$ is the sum of $|B_i|=\Theta(n)$ Bernoulli random variables:
$$|\mathbf{z}_i|=\sum_{uv\in B_i}\mathbbm{1}_{uv\in\mathbf{G}}.$$
Now for every $uv\in B_i$, we have
$$\mathbb{P}[uv\in\mathbf{G}|\mathbf{x}]=\left(1+\frac{\epsilon \mathbf{x}_u\mathbf{x}_v}{2}\right)\frac{d}{n}\leq\frac{2d}{n}.$$
It follows from the Chernoff bound that
\begin{equation}
\label{eq:eq_chernoff_eq_edges_1}
\begin{aligned}
\mathbb{P}\big[|\mathbf{z}_i|>n^{\frac{3}{10}}\big|\mathbf{x}\big]\leq \exp\left(-|B_i|\cdot D_{KL}\left(\frac{n^{\frac{3}{10}}}{|B_i|}\;\Big|\Big|\;\frac{2d}{n}\right)\right),
\end{aligned}
\end{equation}
where $D_{KL}(p||q)=p\log\frac{p}q + (1-p)\log\frac{1-p}{1-q}$ is the Kullback-Leibler divergence between  $\text{Bernoulli}(p)$ and $\text{Bernoulli}(q)$. Now notice that
\begin{equation}
\label{eq:eq_chernoff_eq_edges_2}
\begin{aligned}
D_{KL}\left(\frac{n^{\frac{3}{10}}}{|B_i|}\;\Big|\Big|\;\frac{2d}{n}\right)&=  \frac{n^{\frac{3}{10}}}{|B_i|}\log \frac{\frac{n^{\frac{3}{10}}}{|B_i|}}{\frac{2d}{n}} + \left(1-\frac{n^{\frac{3}{10}}}{|B_i|}\right)\log \frac{1-\frac{n^{\frac{3}{10}}}{|B_i|}}{1-\frac{2d}{n}}\\
&=  \frac{n^{\frac{3}{10}}}{\Theta(n)}\log \frac{\frac{n^{\frac{3}{10}}}{\Theta(n)}}{\frac{2d}{n}} + \left(1-\frac{n^{\frac{3}{10}}}{\Theta(n)}\right)\log \frac{1-\frac{n^{\frac{3}{10}}}{\Theta(n)}}{1-\frac{2d}{n}}\\
&=  \Theta\Paren{n^{-\frac{7}{10}}}\log \Theta\Paren{n^{\frac{3}{10}}} + \left(1-\Theta\Paren{n^{-\frac{7}{10}}}\right)\log \Paren{1-\Theta\Paren{n^{-\frac{7}{10}}}}\\
&=  \Theta\Paren{n^{-\frac{7}{10}}}\log \Theta\Paren{n^{\frac{3}{10}}} -\Theta\Paren{n^{-\frac{7}{10}}} \\
&\geq  \Omega\Paren{n^{-\frac{7}{10}}}.
\end{aligned}
\end{equation}

Therefore, the probability that there is at least one $i\in [n]$ such that $|\mathbf{z}_i|\geq n^{\frac{3}{10}}$ can be upper bounded as

\begin{equation}
\label{eq:eq_chernoff_eq_edges_3}
\begin{aligned}
\mathbb{P}\big[\big\{\exists i\in[n]:\; |\mathbf{z}_i|>n^{\frac{3}{10}}\big\}\big|\mathbf{x}\big]\leq n \exp\left(-\Theta(n)\cdot\Omega\Paren{n^{-\frac{7}{10}}}\right)= n\exp\Paren{-\Omega\Paren{n^{\frac{3}{10}}}}\leq o\Paren{e^{-n^{\frac{1}{4}}}},
\end{aligned}
\end{equation}
where the last inequality is true for $n$ large enough.

Let $\mathcal{E}_B$ be the event that the graph $\mathbf{G}$ contains at most $n^{\frac{3}{10}}$ edges from each set $B_i$. The above shows that
$$\mathbb{P}\left[\mathcal{E}_B\middle| \mathbf{x}\right]\geq 1- o\Paren{e^{-n^{\frac{1}{4}}}}.$$
Now if we have $\mathbb{P}[\mathcal{E}_{n,d,\epsilon,M,\rho,\kappa}^{\text{succ}}|\mathbf{x}]\geq \Omega\Paren{e^{-n^{\frac{1}{4}}}}$, then
\begin{align*}
\mathbb{P}\left[\mathcal{E}_{n,d,\epsilon,M,\rho,\kappa}^{\text{succ}}\middle| \mathbf{x},\mathcal{E}_B\right]&\geq \mathbb{P}\left[\mathcal{E}_{n,d,\epsilon,M,\rho,\kappa}^{\text{succ}}\cap \mathcal{E}_B\middle| \mathbf{x}\right]\geq \mathbb{P}\left[\mathcal{E}_{n,d,\epsilon,M,\rho,\kappa}^{\text{succ}}\middle| \mathbf{x}\right] - \mathbb{P}\left[\mathcal{E}_B^c\middle| \mathbf{x}\right]\\
&\geq \Omega\Paren{e^{-n^{\frac{1}{4}}}}-o\Paren{e^{-n^{\frac{1}{4}}}}= \Omega\Paren{e^{-n^{\frac{1}{4}}}}.
\end{align*}

If we condition on $\mathcal{E}_B$, then with probability 1, the random variables $\mathbf{z}_1,\ldots,\mathbf{z}_{n}$ take values in the set
$$\mathcal{Z}_{(n^{1/3})}=\left\{S\subseteq \left\{1,\ldots,\left\lceil\frac{n+1}{2}\right\rceil\right\}:\; |S|\leq n^{\frac{1}{3}}\right\}.$$
Now define a mapping $G$ from $\mathcal{Z}^{n}_{(n^{1/3})}$ to the set of graphs having $[n]$ as the set of vertices, as follows: If $(z_1,\ldots,z_{n})\in\mathcal{Z}^{n}_{(n^{1/3})}$, then for every $u,v\in[n]$, the edge $uv$ is present in $G(z_1,\ldots,z_{n})$ if and only if there exists $1\leq i\leq n$ and $1\leq j\leq \lceil\frac{n+1}{2}\rceil$ such that $uv\in B_i$, $uv=e_j^{(i)}$, and $j\in z_i$. It is easy to see that we have:
\begin{itemize}
\item[(a')] If $\mathcal{E}_B$ occurs, then $\mathbf{G}=G(\mathbf{z}_1,\ldots,\mathbf{z}_m)$.
\item[(b')] For every $z,z'\in\mathcal{Z}_{(n^{1/3})}^{n}$, if $D_H(z,z')\leq \ell$, then $G(z)$ differs from $G(z')$ by at most $2\ell n^{\frac{1}{3}}$ edges.
\item[(c')] Given $\mathbf{x}$ and $\mathcal{E}_B$, the random variables $\mathbf{z}_1,\ldots,\mathbf{z}_{n}$ are conditionally independent.
\end{itemize}

If we repeat the proof of \cref{eq:eq_Prob_success_conditional_on_Y_first_approach} verbatim but instead of only conditioning on $\mathbf{x}$, we condition on $\mathbf{x}$ and $\mathcal{E}_B$, we get
$$\mathbb{P}\left[\mathcal{E}_{n,d,\epsilon,M,\rho',\kappa}^{\text{succ}}\middle|\mathbf{x},\mathcal{E}_B\right]\geq 1-e^{-n^{\frac{1}{3}}}.$$

Now since $\displaystyle\mathbb{P}\left[\mathcal{E}_B\middle| \mathbf{x}\right]\geq 1-o\Paren{e^{-n^{\frac{1}{4}}}}$, we conclude that

$$\mathbb{P}\left[\mathcal{E}_{n,d,\epsilon,M,\rho',\kappa}^{\text{succ}}\middle|\mathbf{x}\right]\geq \mathbb{P}\left[\mathcal{E}_{n,d,\epsilon,M,\rho',\kappa}^{\text{succ}}\middle|\mathcal{E}_B,\mathbf{x}\right]\cdot\mathbb{P}\left[\mathcal{E}_B\middle| \mathbf{x}\right] \geq 1-e^{-n^{\frac{1}{3}}}-o\Paren{e^{-n^{\frac{1}{4}}}}\geq 1 - \frac{e^{-n^{\frac{1}{4}}}}{2},$$

where the last inequality is true for $n$ large enough.
\end{proof}

\subsubsection{Boosting the probability of success unconditionally}

So far we managed to boost the conditional probability of success given $\mathbf{x}$, but we would like to boost the success probability unconditionally. The blowing-up lemma will help us once again in achieving that. First, we will show that if there is a lower bound $\mathbb{P}\big[\mathcal{E}_{n,d,\epsilon,M,\rho,\kappa}^{\text{succ}}\big]\geq p$, then we can derive a lower bound on the probability that $\mathbf{x}$ satisfies $\mathbb{P}\big[\mathcal{E}_{n,d,\epsilon,M,\rho,\kappa}^{\text{succ}}\big|\mathbf{x}\big]\geq \frac{p}{2}$.

\begin{lemma}
\label{lem:lem_Bound_prob_cond_prob_succ_x}
Let $\epsilon,d,M,\rho$ and $\kappa$ be as in \cref{thm:thm-boosting}. If $p>0$ is such that
$$\mathbb{P}\big[\mathcal{E}_{n,d,\epsilon,M,\rho,\kappa}^{\text{succ}}\big]\geq p,$$
then
\begin{align*}
\mathbb{P}\left[\mathbb{P}\big[\mathcal{E}_{n,d,\epsilon,M,\rho,\kappa}^{\text{succ}}\big|\mathbf{x}\big]> \frac{p}{2}\right]>\frac{p}{2}.
\end{align*}
\end{lemma}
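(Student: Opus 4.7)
My plan is to prove this via a straightforward averaging argument (essentially a reverse Markov inequality applied to the conditional success probability), since the event $\mathcal{E}_{n,d,\epsilon,M,\rho,\kappa}^{\text{succ}}$ depends measurably on the pair $(\mathbf{G},\mathbf{x})$ and we may freely condition on $\mathbf{x}$. Define the conditional success function
\begin{equation*}
  f(\mathbf{x}) \;:=\; \mathbb{P}\big[\mathcal{E}_{n,d,\epsilon,M,\rho,\kappa}^{\text{succ}} \,\big|\, \mathbf{x}\big].
\end{equation*}
By the tower property, $\mathbb{E}[f(\mathbf{x})] = \mathbb{P}\big[\mathcal{E}_{n,d,\epsilon,M,\rho,\kappa}^{\text{succ}}\big] \geq p$. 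Since $f$ takes values in $[0,1]$, we may split the expectation according to whether $f(\mathbf{x}) > p/2$ or not:
\begin{equation*}
  p \;\leq\; \mathbb{E}[f(\mathbf{x})] \;=\; \mathbb{E}\big[f(\mathbf{x}) \mathbbm{1}_{f(\mathbf{x}) > p/2}\big] \,+\, \mathbb{E}\big[f(\mathbf{x}) \mathbbm{1}_{f(\mathbf{x}) \leq p/2}\big].
\end{equation*}

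For the first term I would bound $f \leq 1$ to get $\mathbb{E}[f(\mathbf{x}) \mathbbm{1}_{f(\mathbf{x})>p/2}] \leq \mathbb{P}[f(\mathbf{x}) > p/2]$; for the second I would use $f \leq p/2$ on the relevant event to get $\mathbb{E}[f(\mathbf{x}) \mathbbm{1}_{f(\mathbf{x}) \leq p/2}] \leq (p/2)\mathbb{P}[f(\mathbf{x}) \leq p/2] \leq p/2$. Combining these two bounds yields $p \leq \mathbb{P}[f(\mathbf{x}) > p/2] + p/2$, i.e., $\mathbb{P}[f(\mathbf{x}) > p/2] \geq p/2$.

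To upgrade from $\geq$ to the strict inequality $>$ asserted by the lemma, I would sharpen the second bound using the complementary probability $\mathbb{P}[f(\mathbf{x}) \leq p/2] = 1 - \mathbb{P}[f(\mathbf{x}) > p/2]$, which gives
\begin{equation*}
  p \;\leq\; \mathbb{P}[f(\mathbf{x}) > p/2]\Paren{1 - p/2} + p/2,
\end{equation*}
and hence $\mathbb{P}[f(\mathbf{x}) > p/2] \geq \tfrac{p}{2-p} > \tfrac{p}{2}$ whenever $p < 1$; in the trivial case $p = 1$ we have $f \equiv 1$ almost surely and the conclusion is immediate. There is no real obstacle here beyond a careful bookkeeping of the two cases on the indicator; this is purely a one-line moment argument and I expect the proof in the paper to take exactly this form.
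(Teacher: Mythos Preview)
Your proof is correct and is essentially the same argument as the paper's: the paper applies Markov's inequality to $1-\mathbb{P}[\mathcal{E}^{\text{succ}}\mid\mathbf{x}]$ to obtain $\mathbb{P}[f(\mathbf{x})>p/2]\geq \frac{p}{2-p}>\frac{p}{2}$, which is exactly the bound your sharpened splitting yields. The only cosmetic difference is that you reach $\frac{p}{2-p}$ by decomposing $\mathbb{E}[f]$ directly rather than passing through $1-f$; also, your separate treatment of $p=1$ is unnecessary since $\frac{p}{2-p}>\frac{p}{2}$ already holds for every $p\in(0,1]$.
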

\begin{proof}
Since
\begin{align*}
\mathbb{E}\left[\mathbb{P}\big[\mathcal{E}_{n,d,\epsilon,M,\rho,\kappa}^{\text{succ}}\big|\mathbf{x}\big]\right]&=\mathbb{P}\big[\mathcal{E}_{n,d,\epsilon,M,\rho,\kappa}^{\text{succ}}\big]\geq p,
\end{align*}
we have
\begin{align*}
\mathbb{E}\left[1-\mathbb{P}\big[\mathcal{E}_{n,d,\epsilon,M,\rho,\kappa}^{\text{succ}}\big|\mathbf{x}\big]\right]\leq 1-p.
\end{align*}
Now by Markov inequality, we have
\begin{align*}
\mathbb{P}\left[1-\mathbb{P}\big[\mathcal{E}_{n,d,\epsilon,M,\rho,\kappa}^{\text{succ}}\big|\mathbf{x}\big]\geq 1-\frac{p}{2}\right]\leq \frac{1-p}{1-\frac{p}{2}}=\frac{2-2p}{2-p}.
\end{align*}
Therefore,
\begin{align*}
\mathbb{P}\left[\mathbb{P}\big[\mathcal{E}_{n,d,\epsilon,M,\rho,\kappa}^{\text{succ}}\big|\mathbf{x}\big]> \frac{p}{2}\right]
&\geq 1-\frac{2-2p}{2-p}=\frac{p}{2-p}>\frac{p}{2}.
\end{align*}
\end{proof}

Now we are ready to prove \cref{thm:thm-boosting}

\begin{proof}[Proof of \cref{thm:thm-boosting}]
If $\displaystyle \mathbb{P}\big[\mathcal{E}_{n,d,\epsilon,M,\rho,\kappa}^{\text{succ}}\big]\geq \Omega\Paren{e^{-n^{\frac{1}{4}}}}$ then there exist $C>0$ such that 
$\displaystyle \mathbb{P}\big[\mathcal{E}_{n,d,\epsilon,M,\rho,\kappa}^{\text{succ}}\big]\geq Ce^{-n^{\frac{1}{4}}}$ for $n$ large enough. \cref{lem:lem_Bound_prob_cond_prob_succ_x} now implies that
\begin{align*}
\mathbb{P}\left[\mathbb{P}\big[\mathcal{E}_{n,d,\epsilon,M,\rho,\kappa}^{\text{succ}}\big|\mathbf{x}\big]> \frac{Ce^{-n^{\frac{1}{4}}}}{2}\right] > \frac{Ce^{-n^{\frac{1}{4}}}}{2}=\Omega\Paren{e^{-n^{\frac{1}{4}}}}.
\end{align*}

Now let $\rho''$ be such that $\rho'<\rho''<\rho$. We know from \cref{lem:lem-boosting-conditional} we know that if $\displaystyle\mathbb{P}\big[\mathcal{E}_{n,d,\epsilon,M,\rho,\kappa}^{\text{succ}}\big|\mathbf{x}\big]> \frac{Ce^{-n^{\frac{1}{4}}}}{2}$ and $n$ is large enough, then
$$\mathbb{P}\left[\mathcal{E}_{n,d,\epsilon,M,\rho'',\kappa}^{\text{succ}}\middle|\mathbf{x}\right]\geq 1-\frac{e^{-n^{\frac{1}{4}}}}{2}.$$

We conclude that

\begin{align*}
\mathbb{P}\left[\mathbb{P}\big[\mathcal{E}_{n,d,\epsilon,M,\rho'',\kappa}^{\text{succ}}\big|\mathbf{x}\big]\geq 1-\frac{e^{-n^{\frac{1}{4}}}}{2}\right] \geq \Omega\Paren{e^{-n^{\frac{1}{4}}}}.
\end{align*}

Now define $$\mathcal{X}_{n,d,\epsilon,M,\rho'',\kappa}^{\text{succ}}=\left\{x\in\{-1,+1\}^n:\; \mathbb{P}\big[(\mathbf{G},\mathbf{x})\in\mathcal{E}_{n,d,\epsilon,M,\rho'',\kappa}^{\text{succ}}\big|\mathbf{x}=x\big]\geq 1-\frac{e^{-n^{\frac{1}{4}}}}{2}\right\}.$$ 

We have
\begin{align*}
\mathbb{P}\left[\mathbf{x}\in \mathcal{X}_{n,d,\epsilon,M,\rho'',\kappa}^{\text{succ}}\right]=\mathbb{P}\left[\mathbb{P}\big[\mathcal{E}_{n,d,\epsilon,M,\rho'',\kappa}^{\text{succ}}\big|\mathbf{x}\big]\geq 1-\frac{e^{-n^{\frac{1}{4}}}}{2}\right] \geq \Omega\Paren{e^{-n^{\frac{1}{4}}}}.
\end{align*}

By the blowing-up lemma, if we take $\ell=\lceil n^{\frac{2}{3}}\rceil $, then
\begin{equation}
\label{eq:eq_succ_prob_x_blowup_cond}
\begin{aligned}
\mathbb{P}\left[\mathbf{x}\in \Gamma^{\ell}\left(\mathcal{X}_{n,d,\epsilon,M,\rho'',\kappa}^{\text{succ}}\right)\right] &\geq  1-\exp\left[-2n\left(\frac{n^{\frac{2}{3}}}{n}-\sqrt{\frac{1}{2n}\log \Paren{\frac{1}{\Omega\Paren{e^{-n^{\frac{1}{4}}}}}}}\right)^2\right]\\
&=1-\exp\left[-2n\left(n^{-\frac{1}{3}}-\frac{1}{\sqrt{2}} n^{-\frac{3}{8}}\sqrt{1\pm O\Paren{n^{-\frac{1}{4}}}}\right)^2\right]\geq 1-\exp\left[-n^{\frac{1}{3}}\right],
\end{aligned}
\end{equation}
where the last inequality is true for $n$ large enough.

Let $\tilde{\mathbf{x}}\in \mathcal{X}_{n,d,\epsilon,M,\rho'',\kappa}^{\text{succ}}$ be such that the Hamming distance $D_H(\mathbf{x},\tilde{\mathbf{x}})$ is minimal. We break the ties using the lexicographic order of $\{-1,+1\}^n$. Define $$V_{\mathbf{x}}=\{v\in [n]:\; \mathbf{x}_v\neq \tilde{\mathbf{x}}_v\}.$$ Clearly, $|V_{\mathbf{x}}|=D_H(\mathbf{x},\tilde{\mathbf{x}})$. We will generate a random graph $\tilde{\mathbf{G}}$ with $V(\tilde{\mathbf{G}})=[n]$ as follows:
\begin{itemize}
\item For every $u,v\in [n]\setminus V_{\mathbf{x}}$, we make the edge $uv$ present in $\tilde{\mathbf{G}}$ if and only if it is present in $\mathbf{G}$.
\item For an edge $uv$ such that $u\in V_{\mathbf{x}}$ or $v\in V_{\mathbf{x}}$, we randomly decide its presence in $\tilde{\mathbf{G}}$ independently of $(\mathbf{G},\mathbf{x})$ and in such a way that the conditional probability of $\tilde{\mathbf{G}}$ given $\tilde{\mathbf{x}}$ is consistent with the distribution of the stochastic block model. More precisely, we generate $$\big\{uv\in\tilde{\mathbf{G}}:\; u\in V_{\mathbf{x}}\text{ or }v\in V_{\mathbf{x}}\big\}$$
independently of $(\mathbf{G},\mathbf{x})$, and with the following conditional distribution: For every two sets of edges $$\mathsf{E}\subseteq\big\{uv:\; u,v\in[n],\; u\in V_{\mathbf{x}}\text{ or }v\in V_{\mathbf{x}}\big\}\quad\text{and}\quad \mathsf{E}'\subseteq \big\{uv:\; u,v\in [n]\setminus V_{\mathbf{x}}\big\},$$
we have
\begin{align*}
\mathbb{P}\Big[&\big\{uv\in\tilde{\mathbf{G}}:\; u\in V_{\mathbf{x}}\text{ or }v\in V_{\mathbf{x}}\big\}=\mathsf{E}\;\Big|\;\tilde{\mathbf{x}},\big\{uv\in\tilde{\mathbf{G}}:\; u,v\in [n]\setminus V_{\mathbf{x}}\big\}=\mathsf{E}'\Big]\\
&=\mathbb{P}\Big[\big\{uv\in\mathbf{G}:\; u\in V_{\mathbf{x}}\text{ or }v\in V_{\mathbf{x}}\big\}=\mathsf{E}\;\Big|\;\mathbf{x}=\tilde{\mathbf{x}},\big\{uv\in \mathbf{G}:\; u,v\in [n]\setminus V_{\mathbf{x}}\big\}=\mathsf{E}'\Big].
\end{align*}
\end{itemize}

It is easy to see that the conditional distribution of $\tilde{\mathbf{G}}$ given $\tilde{\mathbf{x}}$ is the same as the conditional distribution of $\mathbf{G}$ given $\mathbf{x}=\tilde{\mathbf{x}}$. Now since $\tilde{\mathbf{x}}\in\mathcal{X}_{n,d,\epsilon,M,\rho'',\kappa}^{\text{succ}}\,$ and since the conditional distribution of $\tilde{\mathbf{G}}$ given $\tilde{\mathbf{x}}$ is that of $\sbm$, it follows from the definition of $\mathcal{X}_{n,d,\epsilon,M,\rho'',\kappa}^{\text{succ}}$ that with probability at least $\displaystyle 1-\frac{e^{-n^{\frac{1}{4}}}}{2}$, we have $(\tilde{\mathbf{G}},\tilde{\mathbf{x}})\in\mathcal{E}_{n,d,\epsilon,M,\rho'',\kappa}^{\text{succ}}\,$, which means that for every graph $G^{\circ}$ that differs from $\tilde{\mathbf{G}}$ by at most $\rho'' n$ edges, we have
$$\iprod{M(G^{\circ}), \tilde{\mathbf{x}}\tilde{\mathbf{x}}^T}\geq {\kappa}\cdot\Snorm{\tilde{\mathbf{x}}}\cdot\Normn{M(G^{\circ})}.$$

Let $\mathcal{E}$ be the event that $\mathbf{x}\in \Gamma^{\ell}\left(\mathcal{X}_{n,d,\epsilon,M,\rho'',\kappa}^{\text{succ}}\right)$ and that for every graph $G^{\circ}$ that differs from $\tilde{\mathbf{G}}$ by at most $\rho'' n$ edges, we have
$$\iprod{M(G^{\circ}), \tilde{\mathbf{x}}\tilde{\mathbf{x}}^T}\geq {\kappa}\cdot\Snorm{\tilde{\mathbf{x}}}\cdot\Normn{M(G^{\circ})}.$$

It follows from \cref{eq:eq_succ_prob_x_blowup_cond} and from the above discussion that
\begin{equation}
\label{eq:eq_succ_prob_Y_prime}
\mathbb{P}\left[\mathcal{E}\right]\geq 1- e^{-n^{\frac{1}{3}}}-\frac{e^{-n^{\frac{1}{4}}}}{2}.
\end{equation}

Let $\mathcal{V}$ be the event that the degree in $\mathbf{G}$ of every vertex $v\in[n]$ is at most $n^{\frac{3}{10}}$. Similarly, let $\tilde{\mathcal{V}}$ be the event that the degree in $\tilde{\mathbf{G}}$ of every vertex $v\in[n]$ is at most $n^{\frac{3}{10}}$. Since the degree of every vertex is the sum of $n$ independent Bernoulli random variables each having a success probability of at most $\frac{2d}{n}$, then by following calculations that are very similar to \cref{eq:eq_chernoff_eq_edges_1}, \cref{eq:eq_chernoff_eq_edges_2} and \cref{eq:eq_chernoff_eq_edges_3}, we can deduce that
\begin{equation}
\label{eq:eq_succ_For_Every_V}
\mathbb{P}[\mathcal{V}]\geq 1 -o\Paren{e^{-n^{\frac{1}{4}}}},
\end{equation}
and
\begin{equation}
\label{eq:eq_succ_For_Every_V_prime}
\mathbb{P}[\tilde{\mathcal{V}}]\geq 1-o\Paren{e^{-n^{\frac{1}{4}}}}.
\end{equation}

Now assume that $\mathcal{E} \cap \mathcal{V}\cap \tilde{\mathcal{V}}$ occurs. Observe the following:
\begin{itemize}
\item Since $\mathcal{E}$ occurs, we have $\mathbf{x}\in \Gamma^{\ell}\left(\mathcal{X}_{n,d,\epsilon,M,\rho'',\kappa}^{\text{succ}}\right)$, which means that $|V_{\mathbf{x}}|=D_H(\mathbf{x},\tilde{\mathbf{x}})\leq \ell$.
\item From the definition of $\tilde{\mathbf{G}}$, we can see that the graphs $\tilde{\mathbf{G}}$ and $\mathbf{G}$ can differ only in edges that are incident to vertices in $V_{\mathbf{x}}$.
\item Since $\mathcal{V}$ occurs, $\mathbf{G}$ contains at most $n^{\frac{3}{10}}\cdot |V_{\mathbf{x}}|\leq  n^{\frac{3}{10}}\cdot \lceil n^{\frac{2}3}\rceil=o(n)$ edges that are incident to vertices in $V_{\mathbf{x}}$.
\item Since $\tilde{\mathcal{V}}$ occurs, $\tilde{\mathbf{G}}$ contains at most $n^{\frac{3}{10}}\cdot |V_{\mathbf{x}}|\leq  n^{\frac{3}{10}}\cdot \lceil n^{\frac{2}3}\rceil=o(n)$ edges that are incident to vertices in $V_{\mathbf{x}}$.
\end{itemize}
Therefore, if $\mathcal{E} \cap \mathcal{V}\cap \tilde{\mathcal{V}}$ occurs, then $\tilde{\mathbf{G}}$ differs from $\mathbf{G}$ by at most $o(n)+o(n)=o(n)$ edges.

Now let $G^{\circ}$ be an arbitrary graph with $V(G^{\circ})=[n]$ such that $G^{\circ}$ differs from $\mathbf{G}$ by at most $\rho'n$ edges. Since $\rho'<\rho''$, then if $n$ is large enough, the graph $G^{\circ}$ differs from $\tilde{\mathbf{G}}$ by at most $\rho'n+o(n)\leq \rho'' n$ edges. Now since $\mathcal{E}$ occurs and since $G^{\circ}$ differs from $\tilde{\mathbf{G}}$ by at most $\rho'' n$ edges, we have
$$\iprod{M(G^{\circ}), \tilde{\mathbf{x}}\tilde{\mathbf{x}}^T}\geq {\kappa}\cdot\Snorm{\tilde{\mathbf{x}}}\cdot\Normn{M(G^{\circ})}.$$

Now notice that
\begin{align*}
\iprod{M(G^{\circ}), \mathbf{x}\mathbf{x}^T}&= \iprod{M(G^{\circ}), \tilde{\mathbf{x}}\tilde{\mathbf{x}}^T} + \iprod{M(G^{\circ}), \mathbf{x}\mathbf{x}^T-\tilde{\mathbf{x}}\tilde{\mathbf{x}}^T}\\
&\geq  {\kappa}\cdot\Snorm{\tilde{\mathbf{x}}}\cdot\Normn{M(G^{\circ})} - |\iprod{M(G^{\circ}), \mathbf{x}\mathbf{x}^T-\tilde{\mathbf{x}}\tilde{\mathbf{x}}^T}|\\
&\geq  {\kappa}\cdot\Snorm{\mathbf{x}}\cdot\Normn{M(G^{\circ})}-\Norm{\tilde{\mathbf{x}}\tilde{\mathbf{x}}^T-\mathbf{x}\mathbf{x}^T}\cdot\Normn{M(G^{\circ})}.
\end{align*}

On the other hand, since $\mathbf{x}$ differs from $\tilde{\mathbf{x}}$ only on $V_{\mathbf{x}}$, and since $|V_{\mathbf{x}}|\leq \ell = \lceil n^{\frac23}\rceil$, we have
\begin{align*}
\Norm{\tilde{\mathbf{x}}\tilde{\mathbf{x}}^T-\mathbf{x}\mathbf{x}^T}
&\leq  \Norm{\tilde{\mathbf{x}}\tilde{\mathbf{x}}^T-\tilde{\mathbf{x}}\mathbf{x}^T} + \Norm{\tilde{\mathbf{x}}\mathbf{x}^T-\mathbf{x}\mathbf{x}^T} =  \Norm{\tilde{\mathbf{x}}(\tilde{\mathbf{x}}^T-\mathbf{x}^T)} + \Norm{(\tilde{\mathbf{x}}-\mathbf{x})\mathbf{x}^T}\\
&\leq  \Norm{\tilde{\mathbf{x}}}\cdot\Norm{\tilde{\mathbf{x}}^T-\mathbf{x}^T} + \Norm{\tilde{\mathbf{x}}-\mathbf{x}}\cdot\Norm{\mathbf{x}^T} = \sqrt{n}\cdot 2\sqrt{|V_{\mathbf{x}}|} + 2\sqrt{|V_{\mathbf{x}}|}\cdot \sqrt{n}\\
&\leq 4\sqrt{n}\cdot \sqrt{n^{\frac{3}{10}}}=o(n)=o\big((\sqrt{n})^2\big)=o(\Snorm{\mathbf{x}}).
\end{align*}

Now since $\kappa'<\kappa$, we get that for $n$ is large enough, we have

\begin{align*}
\iprod{M(G^{\circ}), \mathbf{x}\mathbf{x}^T} &\geq  (\kappa-o(1))\cdot\Snorm{\mathbf{x}}\cdot\Normn{M(G^{\circ})}\\
&\geq \kappa'\cdot\Snorm{\mathbf{x}}\cdot\Normn{M(G^{\circ})}\,.
\end{align*}
This implies that if $\mathcal{E} \cap \mathcal{V}\cap \tilde{\mathcal{V}}$ occurs and $n$ is large enough, then $\mathcal{E}_{n,d,\epsilon,M,\rho',\kappa'}^{\text{succ}}$ occurs as well. By combining this with \cref{eq:eq_succ_prob_Y_prime}, \cref{eq:eq_succ_For_Every_V}, and \cref{eq:eq_succ_For_Every_V_prime}, we conclude that for $n$ large enough, we have
\begin{align*}
\mathbb{P}\big[\mathcal{E}_{n,d,\epsilon,M,\rho',\kappa'}^{\text{succ}}\big]&\geq 1- e^{-n^{\frac{1}{3}}}-\frac{e^{-n^{\frac{1}{4}}}}{2}-o\Paren{e^{-n^{\frac{1}{4}}}}-o\Paren{e^{-n^{\frac{1}{4}}}}\\
&\geq 1-e^{-n^{\frac{1}{4}}}.
\end{align*}
\end{proof}

\section{Trace bounds for stochastic block models}\label{sec:bounds-moments-Q}
% Notation commands
\providecommand{\Ho}{H_{(1)}}
\providecommand{\Ht}{H_{(2)}}
\providecommand{\Vo}{V_{(1)}}
\providecommand{\Vt}{V_{(2)}}
\providecommand{\Vot}{V_{(1,2)}}
\providecommand{\Mo}{M_{(1)}}
\providecommand{\Mt}{M_{(2)}}
\providecommand{\nmultig}[2]{\text{NMULTIG}_{#1,#2}}

In this section  we prove \cref{lem:every-little-thing-is-gonna-be-alright}.
The lemma will be a direct consequence of the  following theorems: a separation in Schatten norm between $\Q(\overline{\mathbf{Y}})$ and $\Q(\overline{\mathbf{Y}})-\dyad{\mathbf{x}}$ and some concentration results on the diagonal entries of $\Q(\overline{\mathbf{Y}})$.

\begin{theorem}
	\label{thm:sbm-schatten-norm}
	Let  $(\mathbf{x}, \mathbf{G})\sim \sbm$. Let $\Delta, A, \tau$ be as defined in \cref{eq:Delta-form}. 
	Suppose $d\geq (1+\delta)\frac{4}{\eps^2}$ for some $\delta > 0$.
	Let $s$ be an integer satisfying
	\begin{align*}
		s\geq \valueofs\,,
	\end{align*}
	and let $t \in \Brac{\frac{1}{400}\log n\,, \frac{1}{100}\log n}$. Then  for $n$ large enough
	\begin{align}
		\label{eq:sbm-Schatten-norm-lower-bound}
		\E \Brac{\Tr\Paren{\Q(\overline{\mathbf{Y}})}^t}\geq  \frac{n^t}{10}\cdot n^{-\frac{2}{50A}}\cdot \Paren{1-(1+\delta)^{-\sqrt{s}\cdot t}-n^{1/10}}\,.
	\end{align}
	Furthermore
	\begin{align}
		\label{eq:sbm-Schatten-norm-upper-bound}
		\E \Brac{\Tr\Paren{\Q(\overline{\mathbf{Y}})-\dyad{\mathbf{x}}}^t}\leq (1+\delta)^{-t/5}\cdot \E \Brac{\Tr\Paren{\Q(\overline{\mathbf{Y}})}^t}\,.
	\end{align}
\end{theorem}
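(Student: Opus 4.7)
The plan is to execute the trace method at the level of block self-avoiding walks sketched in \cref{sec:techniques} and to reduce \emph{both} bounds to a graph counting problem. Expanding the trace yields
\begin{equation*}
  \Tr\Paren{\Q(\overline{\mathbf{Y}})}^t = \sum_{i_1,\ldots,i_t \in [n]} \prod_{k=1}^{t} \Q_{i_k i_{k+1}}(\overline{\mathbf{Y}}) \;=\; c_{n,d,\e,s}\cdot \sum_{H} \overline{\mathbf{Y}}_H\,,
\end{equation*}
where $H$ ranges over $(s,t)$-block self-avoiding walks with generating walks $W_1,\ldots,W_t$ on $K_n$, and $c_{n,d,\e,s}=\Paren{\tfrac{2n}{\e d}}^{st}/\prod_k \Card{\saw{i_k i_{k+1}}{s}}$. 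Taking conditional expectation given $\mathbf{x}$ and using \cref{fact:sbm-expectation-polynomial-2}--\cref{fact:sbm-expectation-powers-polynomial-2}, each $H$ contributes (up to a truncation-error factor accounted for separately) an amount proportional to $\Paren{d/n}^{\Card{E(H)}}\cdot(\e/2)^{\Card{E_1(H)}}\cdot \prod_{ij\in E_{\ge 2}(H)}(1+\tfrac{\e}{2}\mathbf{x}_i\mathbf{x}_j)$, where $E_1(H)$ collects edges with multiplicity one and $E_{\ge 2}(H)$ the rest. Both statements then reduce to enumerating isomorphism classes of $(s,t)$-block self-avoiding multigraphs by their shape (i.e.\ by $\Card{V(H)}$, $\Card{E_1(H)}$, $\Card{E_{\ge 2}(H)}$, and local branching structure) and summing the above weights over each class.

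For the lower bound \cref{eq:sbm-Schatten-norm-lower-bound}, I would identify the dominant class of walks, namely those in which each generating walk $W_k$ shares a significant fraction of edges with $W_{k-1}$ (edges traversed with multiplicity $\ge 2$), so that the ratio $(1+\delta)$ accumulated per shared edge beats the $\Paren{d/n}^{\Card{E(H)}}$ penalty. The clean leading term corresponds to ``tree-doubled'' or ``near-Eulerian'' block walks where every edge has multiplicity exactly two; the total weight of this class is $\ge n^t/\Paren{10 n^{2/(50A)}}$, matching the stated bound. The $(1+\delta)^{-\sqrt{s}\cdot t}$ error term will arise from walks whose shape deviates from this dominant template by at least $\sqrt{s}\cdot t$ ``irregular'' steps, each of which costs a factor of $(1+\delta)$. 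The polynomial factor $n^{-2/(50 A)}$ absorbs the small overcounting in converting between labeled walks on $K_n$ and labeled walks in the truncated graph $\overline{\mathbf{G}}$; here $\Delta$ being polynomial in $(d,s)$ keeps degree-truncation effects under control via the second-moment bounds for $\overline{\mathbf{Y}}_{ab}$ from \cref{fact:sbm-expectation-powers-polynomial-2}.

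For the upper bound \cref{eq:sbm-Schatten-norm-upper-bound}, the centering effect described in \cref{sec:techniques} is key: every $(s,t)$-block self-avoiding walk $H$ at least one of whose generating walks $W_k$ has all edges with multiplicity one contributes \emph{zero} to $\E\bracbb{\Tr(\Q(\overline{\mathbf{Y}}) - \dyad{\mathbf{x}})^t \mid \mathbf{x}}$, because the corresponding factor $\E[\mathbf{Y}_{W_k} - (\tfrac{\e d}{2n})^s \mathbf{x}_{i_k}\mathbf{x}_{i_{k+1}}\mid \mathbf{x}]$ vanishes. Hence only walks in which every generating walk shares at least one edge with the union of the others survive. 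I would show that this surviving class forms a $(1+\delta)^{-\Omega(t)}$ fraction of the total mass computed in the first step, by a union-bound/forbidden-configuration argument: each ``shared edge'' constraint on a generating walk reduces the vertex count by one and thus the weight by a factor $(d/n)\cdot n = d$, while the correlation gain is only $(1+\e/2)$, giving a net factor of $(1+\delta)^{-\Theta(1)}$ per constrained generating walk; summing over the $t$ walks yields the exponent $t/5$ after tuning with $s\ge \valueofs$.

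The main obstacle will be getting the counting of walk-shapes sharp enough on both sides of the inequality to survive the huge normalization $\Paren{2n/(\e d)}^{st}$ without giving away more than a constant factor (lower bound) or more than $(1+\delta)^{-t/5}$ (upper bound). This is delicate because the number of $(s,t)$-block self-avoiding walk shapes grows super-exponentially in $st$, and naive bounds lose too much. I would handle this by a careful encoding of shapes in terms of their ``excess'' (the quantity $\Card{E(H)} - \Card{V(H)}+1$ summed block-by-block) and the pattern of edge reuse between consecutive blocks, so that the main term and correction terms can be separated cleanly. The truncation from $\mathbf{Y}$ to $\overline{\mathbf{Y}}$ introduces additional dependencies between edges of $H$ (through the degree caps), but since $\Delta$ is polynomial in $s$ and $d$ only a negligible fraction of the counted walks passes through a high-degree vertex, as can be checked with Chernoff bounds on vertex degrees in $\sbm$. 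The detailed execution of these counts is deferred to \cref{sec:appendix-lower-bound-non-centered,sec:appendix-upper-bound-centered,sec:technical-lemmas-trace-bounds,sec:tools-bsaws}, following the conventions set up in \cref{sec:preliminaries}.
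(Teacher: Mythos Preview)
Your overall trace-method framework and the centering observation (walks with a generating sub-walk consisting entirely of multiplicity-one edges contribute zero to the centered expectation) are correct and match the paper. However, your identification of the dominant class of block self-avoiding walks is backwards, and this breaks both parts of the argument.

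You claim the leading term for the lower bound comes from ``tree-doubled'' walks where every edge has multiplicity exactly two. In fact the paper shows the opposite: the dominant contribution comes from \emph{nice} walks in which $E_1(H)$ (the multiplicity-one edges) forms a long cycle of length at least $t/\sqrt{A}$, with at most a forest of multiplicity-two edges hanging off single vertices of that cycle (see \cref{lem:sbm-upperbound-negligible-walks} and \cref{lem:sbm-contribution-nice-walks}). The heuristic is: a pure multiplicity-one cycle of length $st$ contributes roughly $n^{st}\cdot(\e d/2n)^{st}=(\e d/2)^{st}$, whereas an all-doubled walk contributes roughly $n^{st/2+1}\cdot(d/n)^{st/2}=n\cdot d^{st/2}$; their ratio is $(\e^2 d/4)^{st/2}/n=(1+\delta)^{st/2}/n$, which for $t=\Theta(\log n)$ and $s$ large is $\gg 1$. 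So above the Kesten--Stigum threshold the ``signal'' cycle dominates the ``noise'' doubled walks, not the other way around.

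This reversal is not cosmetic: it makes your upper-bound argument internally inconsistent. If the dominant class really were the all-doubled walks, then \emph{none} of its members would be killed by centering (every generating walk in an all-doubled $H$ touches a multiplicity-$\ge 2$ edge), so centering could not produce the $(1+\delta)^{-t/5}$ gap. The paper's argument works precisely because the dominant nice walks have \emph{many} generating sub-walks entirely in $E_1(H)$; these are the walks annihilated by centering (\cref{lem:nice-walks-centered}, \cref{cor:bound-contribution-nnbsaw}), and what survives---walks where every block meets a multiplicity-$\ge 2$ edge---is shown to be a $(1+\delta)^{-\Omega(t)}$ fraction via the counting in \cref{lem:bound-special-nbsaw-centered}. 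Your net-factor heuristic (``shared-edge constraint costs $d$, correlation gains $1+\e/2$'') actually computes this correctly in spirit, but you need to apply it with the cycle as the baseline, not the doubled tree.
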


Let's spend a moment discussing \cref{thm:sbm-schatten-norm}.
Consider \cref{eq:sbm-Schatten-norm-lower-bound}, the term containing $(1+\delta)^{\sqrt{s}\cdot t}$ corresponds to random noise that is uncorrelated with the underlying partition $x$, and it can be made arbitrarily small by increasing the length $s$ of the self-avoiding walks. The term $n^{-\frac{1}{100A}}$ is an approximation factor that appears for technical reasons\footnote{This approximation factor appears when trying to bound the dependencies that are caused by the truncation.}. Hhowever  by choice of $A$ it will be negligible. While we limit the constant chosen for $t$ in a  certain range, this appears to be quite flexible as long as $t< \log n$.

More interestingly,  \cref{eq:sbm-Schatten-norm-upper-bound} shows  the push-out effect of the matrix $\Q(\overline{\mathbf{Y}})$: even though $\dyad{\mathbf{x}}$ is not the matrix maximizing the $t$-Schatten norm of $\Q(\overline{\mathbf{Y}})$, it is $\delta^{O(1)}$-correlated with it.

\begin{theorem}\label{thm:sbm-bound-second-moment-large-t}
	Consider the settings of \cref{thm:sbm-schatten-norm}.
	Then for any $u,v\in [n]\,, u\neq v$
	\begin{align*}
		\E \Brac{\Paren{\Q(\overline{\mathbf{Y}})^t}_{uu}\Paren{\Q(\overline{\mathbf{Y}})^t}_{vv}}\leq (1+o(1))	\E \Brac{\Paren{\Q(\overline{\mathbf{Y}})^t}_{uu}}^2\,.
	\end{align*}
	Moreover,  for any $u \in [n]$
	\begin{align*}
		\E \Brac{\Paren{\Paren{\Q(\overline{\mathbf{Y}})^t}_{uu}}^2}\leq C\cdot \E \Brac{\Paren{\Q(\overline{\mathbf{Y}})^t}_{uu}}^2\,,
	\end{align*}
	where $C>1$ is a universal constant.
\end{theorem}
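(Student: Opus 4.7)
The plan is to mirror the trace-method reduction to graph counting used in the proof of Theorem~\ref{thm:sbm-schatten-norm}, but applied to pairs of diagonal entries. Expanding using the definition of $\Q^{(s)}$,
\begin{align*}
\Paren{\Q(\overline{\mathbf{Y}})^t}_{uu} = c_{n,s,t}\sum_{H\in\mathcal{H}_u} \overline{\mathbf{Y}}_H\,,
\end{align*}
where $\mathcal{H}_u$ is the set of closed $(s,t)$-block self-avoiding walks based at $u$ and $c_{n,s,t}$ is the scaling factor inherited from $\Q^{(s)}$. Squaring and taking expectations expresses $\E[(\Q^t)_{uu}(\Q^t)_{vv}]$ as a sum over pairs $(H,H')\in\mathcal{H}_u\times\mathcal{H}_v$ of joint expectations $\E[\overline{\mathbf{Y}}_H\cdot\overline{\mathbf{Y}}_{H'}\given \mathbf{x}]$, whereas the right-hand side of the theorem is the analogous sum with the joint expectation replaced by the product of individual expectations.

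First I would partition the pair sum according to the interaction of $H$ and $H'$: (i) vertex-disjoint pairs, (ii) pairs sharing vertices but no edges, and (iii) pairs sharing at least one edge. In regime~(i), the entries of $\overline{\mathbf{Y}}$ touched by $H$ and $H'$ are conditionally independent given $\mathbf{x}$ up to a $1+o(1)$ coupling introduced by the degree truncation, which is controlled by the same vertex-degree concentration that underlies Section~\ref{sec:appendix-lower-bound-non-centered}. This regime yields $(1+o(1))\cdot(\E[(\Q^t)_{uu}])^2$, using also that $\E[(\Q^t)_{uu}] = \E[(\Q^t)_{vv}]$ by vertex-permutation symmetry of $\sbm$. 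For regimes (ii) and (iii), I would view the union multigraph $H\oplus H'$ as a bi-based BSAW structure of total block-length $2t$ and apply the combinatorial accounting underlying \eqref{eq:sbm-Schatten-norm-lower-bound}: each shared vertex costs a factor of roughly $1/n$ in the enumeration without a compensating change in the expectation, and each shared edge further reduces the expectation estimate by a factor of $d/n$ per extra appearance via Fact~\ref{fact:sbm-expectation-powers-polynomial-2}. Summing using the tools of Section~\ref{sec:technical-lemmas-trace-bounds} and Section~\ref{sec:tools-bsaws} shows that regimes (ii)+(iii) contribute at most $o(1)\cdot(\E[(\Q^t)_{uu}])^2$ when $u\neq v$.

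For the second bound with $u=v$, both BSAWs share a base point from the outset, which is tantamount to removing one factor of $n$ from regime~(i)'s enumeration: that missing $n$ is what turns the $1+o(1)$ bound into an $O(1)$ bound, with the constant $C$ coming from the combinatorial weights of BSAWs sharing a distinguished vertex. Regimes (ii) and (iii) then carry over verbatim and still contribute only $O(1)\cdot(\E[(\Q^t)_{uu}])^2$.

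The main obstacle is controlling regime~(iii) when $H$ and $H'$ share many edges, since the multigraph $H\oplus H'$ can then contain edges of high multiplicity and complicated cycle structure. These are precisely the noise contributions already handled in Section~\ref{sec:technical-lemmas-trace-bounds} through the excess-edge/excess-cycle bookkeeping developed for the proof of Theorem~\ref{thm:sbm-schatten-norm}; the fact that $s$ is chosen so large relative to $\log(1/\delta)$ is exactly what ensures that, summed over pairs, these contributions do not overwhelm the leading regime~(i) term. A secondary subtlety is that to interpret the bound quantitatively one needs $\E[(\Q^t)_{uu}]$ to be bounded below by $\Omega(n^{t})\cdot n^{-o(1)}$, which we may lift from the lower bound \eqref{eq:sbm-Schatten-norm-lower-bound} via the permutation-symmetry identity $\E\Tr\Q^t = n \cdot \E[(\Q^t)_{uu}]$.
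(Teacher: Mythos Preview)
Your approach is essentially the paper's: expand the product as a sum over pairs of block self-avoiding walks rooted at $u$ and $v$, classify by overlap structure, and show that the minimal-overlap contribution dominates. Two points deserve correction, however.

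First, your explanation of the constant $C$ in the $u=v$ case is off. It does not come from losing a factor of $n$ in the enumeration of regime~(i): pairs sharing only the base vertex $u$ still constitute a $1-o(1)$ fraction of $\mathcal{H}_u\times\mathcal{H}_u$, and when their edge sets are disjoint the expectation factorizes. The constant arises instead from summing over the number $\ell$ of \emph{shared edges}. A shared edge $e$ raises the joint expectation relative to the product by $\E[\mathbf{Y}_e^2]/\E[\mathbf{Y}_e]^2\approx \tfrac{4n}{\epsilon^2 d}=\tfrac{n}{1+\delta}$ (not reduces it by $d/n$ as you wrote), while the enumeration of such pairs drops by a factor $n^{-1}$; the net $(1+\delta)^{-1}$ per shared edge gives a convergent geometric sum equal to $C$. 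This is the content of \cref{lem:variance-nice-multigraphs-same-pivot} and \cref{lem:second-moment-counting-nice-multigraphs}.

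Second, the assertion that for vertex-disjoint pairs $\overline{\mathbf{Y}}_H$ and $\overline{\mathbf{Y}}_{H'}$ are ``conditionally independent up to a $1+o(1)$ coupling'' is the crux of the argument and is far from automatic: the truncation couples \emph{all} entries of $\overline{\mathbf{Y}}$ through the vertex degrees in $\mathbf{G}$. The paper handles this via the well-behaved event of \cref{subsubsec:subsubsec_well_behaved_even_V_b} and the multiplicativity $P_s^{H\oplus H'}=P_s^{H}\cdot P_s^{H'}$ for disjoint $H,H'$ established in \cref{lem:lem_Prob_safety_pleasant_wb}, which is what makes \cref{lem:variance-nice-multigraphs-different-pivot} go through.
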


The inequalities in \cref{thm:sbm-bound-second-moment-large-t} shows that both the trace itself and the diagonal entries of $\Paren{\Q(\overline{\mathbf{Y}})}^t$ have small variance and thus concentrate around their expectations.
Together with  \cref{thm:sbm-schatten-norm}, we can use  \cref{thm:sbm-bound-second-moment-large-t} to prove \cref{lem:every-little-thing-is-gonna-be-alright}. 
\begin{proof}[Proof of \cref{lem:every-little-thing-is-gonna-be-alright}]
	\cref{eq:event-bound-expectation-trace} follows directly by \cref{thm:sbm-schatten-norm}.
	We can bound the variance of the trace applying \cref{thm:sbm-bound-second-moment-large-t} 
	\begin{align*}
			\E \Brac{\Tr\Paren{\Q(\overline{\mathbf{Y}})}^{2t}} =& \underset{i,j \in [n]}{\sum} \E \Brac{\Paren{\Q(\overline{\mathbf{Y}})^t}_{ii}\Paren{\Q(\overline{\mathbf{Y}})^t}_{jj}}\\
			=& (1+o(1)) \underset{i,j \in [n]}{\sum} \E \Brac{\Paren{\Q(\overline{\mathbf{Y}})^t}_{ii}\Paren{\Q(\overline{\mathbf{Y}})^t}_{jj}}\\
			=& (1+o(1)) \underset{i,j \in [n]\,, i\neq j}{\sum} \E \Brac{\Paren{\Q(\overline{\mathbf{Y}})^t}_{ii}\Paren{\Q(\overline{\mathbf{Y}})^t}_{jj}}\\
			=& (1+o(1)) \E \Brac{\Tr\Paren{\Q(\overline{\mathbf{Y}})}^{t}}^2 \,,
	\end{align*}
	where in the third step we used the fact that there is only a linear number of terms with $i=j$.
	Thus \cref{eq:event-bound-trace} holds with high probability through an application of Chebyshev's inequality.
	Then by Markov's inequality
	\begin{align*}
		\bbP  \Set{\Tr\Paren{\Q(\overline{\mathbf{Y}})-\dyad{\mathbf{x}}}^t\geq \Paren{1+\delta}^{-t/10}\Tr\Paren{\Q(\overline{\mathbf{Y}})}^{t}}\leq o(1)\,.
	\end{align*}
	It remains to prove \cref{eq:event-bound-entries}.
	By \cref{thm:sbm-bound-second-moment-large-t}, there is a universal constant $C>0$ such that
	\begin{align*}
		\E 	\Brac{\underset{i \in [n]}{\sum}\Paren{\Paren{\Q(\overline{\mathbf{Y}})}^{2t-2}}^2_{ii}} &\leq  C\cdot 	\E 	\Brac{\underset{i \in [n]}{\sum}\Paren{\Paren{\Q(\overline{\mathbf{Y}})}^{2t-2}}_{ii}}^2\\
		& = C\cdot 	\E 	\Brac{ \Tr \Paren{\Q(\overline{\mathbf{Y}})}^{2t-2}}^2\\
		& = (1+o(1))\cdot C \cdot \Brac{ \Tr \Paren{\Q(\overline{\mathbf{Y}})}^{2t-2}}^2\,,
	\end{align*}
	where in the last step we used concentration of the trace.
	By Markov's inequality, setting $\gamma = O(C)$ we obtain the desired result.
\end{proof}

\paragraph{Organization of the section}
The rest of the section will contain the proofs of  \cref{thm:sbm-schatten-norm} and  \cref{thm:sbm-bound-second-moment-large-t}. To prove the theorems
we will reduce the study of $\Tr\Paren{Q^{(s)}(\overline{\mathbf{Y}})}^t$ and related quantities to the combinatorial problem of counting multi-graphs. 

We introduce preliminary facts and a bird-eye view of the in \cref{sec:trace-preliminaries}. 
We prove \cref{thm:sbm-schatten-norm}  in \cref{sec:lowerbound-Schatten-norm} and  \cref{sec:upperbound-schatten-norm-centered}. Finally, we obtain \cref{thm:sbm-bound-second-moment-large-t} in \cref{sec:concentration-bsaws}. For simplicity of the exposition, the following sections will  be essentially oblivious to the technical  challenges arising when studying moments of truncated graphs.
We defer a high level discussion of the truncation effect to sections \cref{sec:appendix-lower-bound-non-centered} and \cref{sec:appendix-upper-bound-centered}. We present the technical arguments in \cref{sec:appendix-lower-bound-non-centered},  \cref{sec:appendix-upper-bound-centered}, \cref{sec:technical-lemmas-trace-bounds} and \cref{sec:tools-bsaws}.
In the forthcoming sections, we always assume the settings of \cref{thm:sbm-schatten-norm} to hold.

\subsection{Preliminary discussion}\label{sec:trace-preliminaries}

Let $(\mathbf{x}, \mathbf{G})\sim \sbm$ and let $\mathbf{Y}$ be the adjacency matrix of $\mathbf{G}$. Recall the matrix polynomial $\Q(\mathbf{Y})$ defined in \cref{eq:definition-polynomial-q}: 
\begin{align*}
	Q_{ij}^{(s)}(\mathbf{Y}) = 
	\begin{cases}
		\frac{1}{\card{\saw{ij}{s}}}\Paren{\frac{2n}{\eps \cdot d}}^{s}\underset{H\in \saw{ij}{s}}{\sum} Y_H & \text{if $i\neq j$\,,}\\
		0 &\text{otherwise.}
	\end{cases}\,,
\end{align*}
for all $i,j \in [n]$. By \cref{fact:sbm-expectation-polynomial-2}, $\Q(\mathbf{Y})$ is an unbiased estimator of $\dyad{\mathbf{x}}$ (up to the diagonal entries) but, as already discussed, we will need to work with a truncated version of the graph.
For a graph $G$, we will denote by $\overline{G}$ the graph obtained from $G$ by deleting all the vertices which have degree more than $\Delta$ in $G$. Similarly, we will denote by $\overline{Y}$ its truncated adjacency matrix as defined in \cref{def:truncated_adjacency_matrix}.

\subsubsection{From trace computations to graph counting}
Next, we show how to reduce the trace computation to a multigraph counting problem.  Each term in the sum $\Tr\Paren{Q^{(s)}(\overline{Y})}^t$ is a concatenation of self-avoiding walks and hence correspond  to a multigraph over some subset of vertices $[n]$. We make this idea precise below.

\begin{definition}[Block SAW]\label{def:block-saw}
	Let $\bsaw{s}{t}$ be the set of  multi-graphs  obtained as follows.
	Pick $i_1,\ldots,i_{t}\in [n]$ distinct vertices in $K_n$ and set $i_{t+1}:=i_1$. For each $q\in [t]$, pick $W_q\in \saw{i_qi_{q+1}}{s}(K_n)$. Then 
	\[
	\underset{q\in [t]}{\bigoplus} W_q\in \bsaw{s}{t}	\,.
	\]
	Let $H= \underset{q\in [t]}{\bigoplus}W_q$.
	We call $I(H)= \Set{i_1,\ldots,i_t}$ the set of \textit{pivot} vertices of $H$ and $\cW(H)=\Set{W_1,\ldots,W_t}$ the generating self-avoiding walks of $H$. We denote by $M(\cW(H))$ the sequence of edges obtained concatenating the sequences $M(W_1),\ldots, M(W_t)$.
	We call $\bsaw{s}{t}$ the set of $(s,t)$-block self-avoiding walks. At times, we will also use $M(\cW(H))$ to denote the set of edges in the sequence $M(\cW(H))$. 
\end{definition}

We can now expand $\Tr\Paren{Q^{(s)}(Y)}$.

\begin{lemma}\label{lem:expectation-trace-expansion}
	Consider the settings of  \cref{thm:sbm-schatten-norm}. Let $G$ be a graph over $[n]$ with centered adjacency matrix $Y$. Then
	\begin{align}
		\label{eq:sbm-trace-expansion}
		\Tr\Paren{Q^{(s)}( Y)^t} =  \Paren{1\pm o(1)}\cdot n^t \Paren{\frac{2}{\eps \cdot d}}^{st}\cdot \underset{H \in \bsaw{s}{t}}{\sum} Y_H\,.
	\end{align}
	\begin{proof}
		By definition of trace, 
		\begin{align*}
			\Tr \Paren{Q^{(s)}( Y)^t} =& \underset{i_1,\ldots,i_t \in [n]}{\sum} Q_{i_1i_2}^s( Y)\cdots Q_{i_{t-1}i_t}^s( Y)\cdot Q_{i_ti_1}^s( Y)\\
			= & \frac{1}{\Paren{(n-2)^{\underline{s-1}}}^t}\cdot \Paren{\frac{2n}{\eps\cdot d}}^{st}\cdot \underset{i_1,\ldots,i_t \in [n]}{\sum} \Brac{ \underset{\ell \in [t]}{\prod} \Paren{\underset{W \in \saw{i_\ell i_{\ell+1}}{s}}{\sum}Y_W}}\\
			= & \frac{1}{\Paren{(n-2)^{\underline{s-1}}}^t}\cdot \Paren{\frac{2n}{\eps\cdot d}}^{st}\cdot \underset{H \in \bsaw{s}{t}}{\sum} Y_H\\
			= &   \Paren{1\pm o(1)}\cdot n^t \Paren{\frac{2}{\eps \cdot d}}^{st}\cdot \underset{H \in \bsaw{s}{t}}{\sum} Y_H\,.
		\end{align*}	
	\end{proof}
\end{lemma}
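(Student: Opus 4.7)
The statement is an entirely bookkeeping expansion: I would simply expand the trace, substitute the definition of $Q^{(s)}_{ij}(Y)$ from \cref{eq:definition-polynomial-q}, and identify the resulting expression with the sum over $(s,t)$-block self-avoiding walks. No probabilistic content is needed and the factor $(1\pm o(1))$ comes entirely from the asymptotic behavior of $(n-2)^{\underline{s-1}}$ and from the contribution of degenerate pivot sequences.

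\textbf{Step 1 (Trace expansion).} By definition of matrix trace and power,
\[
\Tr\bigl(Q^{(s)}(Y)^t\bigr) \;=\; \sum_{i_1,\ldots,i_t\in [n]} Q^{(s)}_{i_1 i_2}(Y)\, Q^{(s)}_{i_2 i_3}(Y)\cdots Q^{(s)}_{i_t i_1}(Y).
\]
Since $Q^{(s)}_{ii}(Y)=0$, every nonzero contribution satisfies $i_\ell \neq i_{\ell+1}$ for all cyclic indices $\ell$.

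\textbf{Step 2 (Substitute $Q^{(s)}$ and factor the prefactor).} Plugging in the definition and using $|\saw{i j}{s}| = (n-2)^{\underline{s-1}}$ whenever $i\neq j$ yields
\[
\Tr\bigl(Q^{(s)}(Y)^t\bigr) \;=\; \frac{1}{\bigl[(n-2)^{\underline{s-1}}\bigr]^{t}}\cdot\Paren{\frac{2n}{\eps d}}^{st}\,\cdot\sum_{\substack{i_1,\ldots,i_t\in [n]\\ i_\ell\neq i_{\ell+1}}}\;\prod_{\ell=1}^t\;\sum_{W_\ell\in \saw{i_\ell i_{\ell+1}}{s}} Y_{W_\ell}.
\]
Expanding the product of sums gives a sum over $t$-tuples of self-avoiding walks $(W_1,\ldots,W_t)$ whose endpoints match cyclically, with integrand $Y_{W_1}\cdots Y_{W_t} = Y_H$ for $H=\bigoplus_\ell W_\ell$.

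\textbf{Step 3 (Identification with $\bsaw{s}{t}$).} Tuples $(i_1,\ldots,i_t)$ of \emph{distinct} pivots together with self-avoiding walks $W_\ell\in\saw{i_\ell i_{\ell+1}}{s}$ are exactly the data defining elements of $\bsaw{s}{t}$ in \cref{def:block-saw}, so the contribution of distinct-pivot terms equals $\sum_{H\in\bsaw{s}{t}} Y_H$. The contribution of sequences with at least one non-consecutive repeated pivot is smaller by a factor $O(t^2/n)=o(1)$: such sequences have at most $t-1$ free pivot choices instead of $t$, and the inner walk-sum is of the same order. Hence the total equals $(1\pm o(1))\sum_{H\in\bsaw{s}{t}} Y_H$.

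\textbf{Step 4 (Simplify the scalar).} Finally,
\[
(n-2)^{\underline{s-1}} \;=\; n^{s-1}\prod_{k=2}^{s}\!\Paren{1-\tfrac{k}{n}} \;=\; n^{s-1}\bigl(1\pm O(s^2/n)\bigr),
\]
so that $\bigl[(n-2)^{\underline{s-1}}\bigr]^t = n^{(s-1)t}\bigl(1\pm O(s^2 t/n)\bigr)$ and
\[
\frac{1}{\bigl[(n-2)^{\underline{s-1}}\bigr]^t}\cdot\Paren{\frac{2n}{\eps d}}^{st}\;=\;(1\pm o(1))\cdot n^{t}\Paren{\frac{2}{\eps d}}^{st},
\]
using $s=\poly\log n$ and $t \le \tfrac{\log n}{100}$, which gives $s^2 t/n=o(1)$. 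Combining Steps~2--4 yields the claim.

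\textbf{Main obstacle.} There is no real obstacle; the only place that requires slight care is justifying that non-distinct pivot sequences are absorbed into the $(1\pm o(1))$ factor, which reduces to a standard counting estimate comparing $n^t$ with the $n^{t-1}\cdot t^2$ degenerate sequences.
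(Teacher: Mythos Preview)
Your approach matches the paper's: expand the trace, substitute \cref{eq:definition-polynomial-q}, identify with the block-SAW sum, simplify the scalar. The paper does exactly your Steps~1,~2, and~4, and it writes the passage from the sum over all pivot tuples to $\sum_{H\in\bsaw{s}{t}}Y_H$ as an \emph{exact} equality, attributing the $(1\pm o(1))$ entirely to the scalar $(n-2)^{\underline{s-1}}\approx n^{s-1}$.

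Your Step~3 is more scrupulous than the paper: you notice that \cref{def:block-saw} literally requires the pivots $i_1,\ldots,i_t$ to be distinct, whereas the trace sum only forbids consecutive repeats (via $Q^{(s)}_{ii}=0$). But your justification there does not hold for a \emph{deterministic} $Y$: comparing the $\sim t^2 n^{t-1}$ degenerate pivot sequences to the $\sim n^t$ non-degenerate ones bounds the degenerate contribution only in absolute value (each term is at most $n^{(s-1)t}$), not relative to the actual value of $\sum_{H\in\bsaw{s}{t}}Y_H$, which for an arbitrary graph could be arbitrarily small due to cancellation. The cleanest reading is that the word ``distinct'' in \cref{def:block-saw} is a slip and $\bsaw{s}{t}$ should allow repeated pivots (only $i_\ell\neq i_{\ell+1}$ enforced); this makes the identification exact, is consistent with how the paper writes the proof, and renders your Step~3 unnecessary.
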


\cref{def:block-saw} captures all the elements in $\Tr\Paren{Q^{(s)}(Y)}$. Furthermore, a similar expansion can be carried out for the centered trace $\Tr\Paren{Q^{(s)}(\overline{Y})-\dyad{x}}^t$.
For simplicity of the notation, in the next expressions, for a given set of $t$ vertices $\Set{i_1,\ldots,i_t}$ we denote $i_1$ also by $i_{t+1}$.

\begin{fact}\label{fact:sbm-trace-expansion-centered}
	Consider the settings of  \cref{thm:sbm-schatten-norm}. Let $M\in \R^{n\times n}$. Then for any graph $G$ with centered adjacency matrix $Y$
	\begin{align}
		\label{eq:sbm-trace-expansion-centered}
		\Tr \Paren{Q^{(s)}(Y)-M}^t =& \Paren{1\pm o(1)}n^t\cdot \Paren{\frac{2}{\eps \cdot d}}^{st}\nonumber\\
		\cdot & \underset{i_1,\ldots,i_t\in [n]}{\sum}\underset{\ell \in [t]}{\prod} \Brac{\underset{W\in \saw{i_{\ell}i_{\ell+1}}{s}}{\sum} \Paren{Y_W-\Paren{\frac{\eps\cdot d}{2n}}^s\cdot \frac{1}{\Card{\saw{i_\ell i_{\ell+1}}{s}}}\cdot M_{i_\ell i_{\ell+1}}}}\,.
	\end{align}
	\begin{proof}
		Simply expanding the trace
		\begin{align*}
			\Tr \Paren{Q^{(s)}(Y)-M}^t =& \underset{i_1,\ldots,i_t\in [n]}{\sum}\underset{\ell \in [t]}{\prod} \Paren{Q^{(s)}_{i_\ell i_{\ell+1}}(Y)-M_{i_\ell i_{\ell+1}}}\\
			=& \underset{i_1,\ldots,i_t\in [n]}{\sum}\underset{\ell \in [t]}{\prod} \Paren{\frac{1}{\card{\saw{i_{\ell}i_{\ell+1}}{s}}}\underset{W\in \saw{i_{\ell}i_{\ell+1}}{s}}{\sum}\Paren{\frac{2n}{\eps\cdot d}}^s Y_W-M_{i_\ell i_{\ell+1}}}\\
			=& \Paren{\frac{2}{\eps \cdot d}}^{st} \underset{i_1,\ldots,i_t\in [n]}{\sum}\underset{\ell \in [t]}{\prod} \Paren{\frac{n^s}{\card{\saw{i_{\ell}i_{\ell+1}}{s}}}\underset{W\in \saw{i_{\ell}i_{\ell+1}}{s}}{\sum} Y_W-\Paren{\frac{\eps\cdot d}{2}}^s\cdot M_{i_\ell i_{\ell+1}}}\\
			=&\Paren{1\pm o(1)}n^t\cdot \Paren{\frac{2}{\eps \cdot d}}^{st}\nonumber \\
			\cdot& \underset{i_1,\ldots,i_t\in [n]}{\sum}\underset{\ell \in [t]}{\prod} \Brac{\underset{W\in \saw{i_{\ell}i_{\ell+1}}{s}}{\sum} \Paren{Y_W-\Paren{\frac{\eps\cdot d}{2n}}^s\cdot \frac{1}{\card{\saw{i_{\ell}i_{\ell+1}}{s}}}\cdot M_{i_\ell i_{\ell+1}}}}\,.
		\end{align*}
	\end{proof}
\end{fact}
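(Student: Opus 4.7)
The statement is a direct algebraic rearrangement of $\Tr(Q^{(s)}(Y)-M)^t$, so the plan is to expand, substitute the definition of $Q^{(s)}$, and collect constants. I would start by writing
\begin{align*}
\Tr\bigl(Q^{(s)}(Y)-M\bigr)^t \;=\; \sum_{i_1,\ldots,i_t\in[n]} \prod_{\ell\in[t]} \bigl(Q^{(s)}_{i_\ell i_{\ell+1}}(Y) - M_{i_\ell i_{\ell+1}}\bigr),
\end{align*}
with the cyclic convention $i_{t+1}=i_1$, and then substitute $Q^{(s)}_{ij}(Y)=|\mathrm{SAW}^s_{ij}|^{-1}(2n/(\epsilon d))^s\sum_W Y_W$ for $i\neq j$ (the diagonal contribution vanishes because $Q^{(s)}_{ii}\equiv 0$). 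Pulling the $t$ copies of $(2/(\epsilon d))^s$ out of the product and to the front, each of the $t$ factors that remains has the form $\frac{n^s}{|\mathrm{SAW}^s|}\sum_W Y_W - (\epsilon d/2)^s M_{i_\ell i_{\ell+1}}$.

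The one quantitative ingredient is the uniform falling-factorial estimate $|\mathrm{SAW}^s_{ij}| = (n-2)^{\underline{s-1}} = (1\pm o(1))\,n^{s-1}$, valid for $i\neq j$ (and $s$ constant in our setting). Using this I would extract an additional factor of $(1\pm o(1))\,n$ from each of the $t$ factors and aggregate the accumulated $(1\pm o(1))^t$ into a global prefactor $(1\pm o(1))\,n^t(2/(\epsilon d))^{st}$; this aggregation remains $1\pm o(1)$ because $(1+O(1/n))^t=1+o(1)$ for $t=O(\log n)$. Finally, I would absorb the constant $M_{i_\ell i_{\ell+1}}$ inside the walk sum via the identity $c=\sum_{W\in\mathrm{SAW}^s}c/|\mathrm{SAW}^s|$ (any constant equals its average over the walks), which after the rescaling yields exactly the inner factor $\sum_W\bigl(Y_W-(\epsilon d/2n)^s M_{i_\ell i_{\ell+1}}/|\mathrm{SAW}^s_{i_\ell i_{\ell+1}}|\bigr)$ demanded in the conclusion.

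There is no real obstacle: the entire computation is bookkeeping around the single falling-factorial estimate. The reason to bother with this specific form, rather than just $\Tr(Q^{(s)}(Y)-M)^t$ on its own, is that it places the constant $M_{i_\ell i_{\ell+1}}$ inside the walk sum on the same footing as $Y_W$. In the downstream application $M=\mathbf{x}\mathbf{x}^T$, \cref{fact:sbm-expectation-polynomial-2} gives $\E[Y_W\mid\mathbf{x}]=(\epsilon d/2n)^s\mathbf{x}_i\mathbf{x}_j$ for any length-$s$ self-avoiding walk $W$ between $i$ and $j$, so each bracket essentially subtracts a multiple of $\E[Y_W\mid\mathbf{x}]$ from $Y_W$; this is precisely what enables the centering cancellation sketched in \cref{sec:techniques}, under which any block self-avoiding walk containing a generating walk with all edges of multiplicity one contributes zero in conditional expectation.
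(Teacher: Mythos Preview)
Your proposal is correct and follows essentially the same route as the paper's proof: expand the trace, substitute the definition of $Q^{(s)}$, pull the factor $(2/(\epsilon d))^{st}$ to the front, and use the falling-factorial estimate $|\saw{ij}{s}|=(n-2)^{\underline{s-1}}=(1\pm o(1))n^{s-1}$ to extract the global prefactor. Your explicit remarks that $(1\pm O(1/n))^t=1\pm o(1)$ for $t=O(\log n)$ and that the constant $M_{i_\ell i_{\ell+1}}$ can be absorbed into the walk sum are left implicit in the paper but are exactly the two observations used in its last step.
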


To easily refer to the elements in the sum of \cref{fact:sbm-trace-expansion-centered}, we additionally use the following notation. For a given walk $W\in \saw{ij}{s}$ and $(\mathbf{x}, \mathbf{G})\sim \sbm$ we define the polynomial 
\begin{align}\label{eq:centered-polynomial}
	\hat{\mathbf{Y}}_{W}:= \overline{\mathbf{Y}}_W-\Paren{\frac{\eps\cdot d}{2n}}^s\cdot \mathbf{x}_i \mathbf{x}_j\,.
\end{align}
At times, for a set of self-avoiding walks $\cW$ and a multi-graph $H=\underset{W\in \cW}{\bigoplus} W$, we will use the notation
\begin{align}\label{eq:product-centered-polynomials}
	\hat{Y}_H= \underset{W\in \cW}{\prod}\hat{Y}_{W} \,.
\end{align}

\subsubsection{Proofs strategy}\label{sec:proof-strategy}
Here we  outline our proof strategy.

\paragraph{Proof strategy for \cref{thm:sbm-schatten-norm}}
We show the theorem in two steps, first we prove a lower bound $\Norm{\Q(\overline{\mathbf{Y}})}_t\geq C$ (for some meaningful quantity $C$), second an upper bound of the form $\Norm{\Q(\overline{\mathbf{Y}})-\dyad{\mathbf{x}}}_t\leq (1+\delta)^{-\Omega(1)}\cdot C$. The two together immediately imply the theorem.
Our strategy to lower bound $\Norm{\Q(\overline{\mathbf{Y}})}_t$ will be the following:
\begin{enumerate}
	\item For any $x\in\Set{\pm 1}^n$. Prove an upper bound $\overline{U}_H(x)$ for $\big|\mathbb{E}\big[\overline{\mathbf{Y}}_H\big|x\big]\big|$ for every $H\in\bsaw{s}{t}$.
	\item Find a large enough class $\nbsaw{s}{t}\subset \bsaw{s}{t}$ of nice and well-behaved block-self-avoiding-walks whose structure allows us to prove a lower bound $\overline{L}_H$ for $\mathbb{E}\big[\overline{\mathbf{Y}}_H\big]$ for every $H\in \nbsaw{s}{t}$.
	\item Show that the contribution of $\nbsaw{s}{t}^c=\bsaw{s}{t}\setminus\nbsaw{s}{t}$ is negligible with respect to that of $\bsaw{s}{t}$. More precisely, we will show that
	\begin{align}
		\label{eq:step-3}
		\sum_{H\in\nbsaw{s}{t}^c}\mathbb{E}[\overline{U}_H(\mathbf{x})]=o\left(\sum_{H\in\nbsaw{s}{t}}\overline{L}_H\right)\,.
	\end{align}
	This will imply that $$ (1\pm o(1))\cdot n^t\left(\frac{2}{\epsilon\cdot d}\right)^{st}\cdot\sum_{H\in\nbsaw{s}{t}}\overline{L}_H$$
	is a good lower bound for $\mathbb{E}\left[\Tr\left(Q^{(s)}(\overline{\mathbf{Y}})\right)^t\right]$.
\end{enumerate}

Next, to upper bound  $\Norm{\Q(\overline{\mathbf{Y}})-\dyad{\mathbf{x}}}_t$ we will need the following two additional ingredients:
\begin{enumerate}
  \setcounter{enumi}{3}
	\item Show that the class of multigraphs $\nbsaw{s}{t}$ is strongly correlated to $\mathbf{x}$ in the sense that for many $H\in \nbsaw{s}{t}$ 
	\begin{align*}
		\E \Brac{\hat{\mathbf{Y}}_{H}} \leq \Paren{1+\delta}^{-t}\cdot  \overline{L}_H\,.
	\end{align*}
	To get an intuition of why this should be true, notice that for any self-avoiding walk $W\in \saw{ij}{s}$ and for any $(\mathbf{x},\mathbf{G})\sim \sbm$, we have $	\E \Brac{\mathbf{Y}_W-\Paren{\frac{\eps\cdot d}{2n}}^s\cdot \mathbf{x}_i\mathbf{x}_j\given \mathbf{x}} =0$.
	\item The class of multigraphs $\nbsaw{s}{t}^c$ is poorly correlated with $\mathbf{x}$ in the sense that for any $H\in \nbsaw{s}{t}^c$ 
	\begin{align*}
		\E \Brac{\hat{\mathbf{Y}}_H\given \mathbf{x}} \approx \overline{U}_H(\mathbf{x})\,.
	\end{align*}
	Together with step $3$ these will imply a good upper bound on $\mathbb{E}\left[\Tr\Paren{Q^{(s)}(\overline{\mathbf{Y}}) -\dyad{\mathbf{x}}}^t\right]$.
\end{enumerate}

\paragraph{Proof strategy for \cref{thm:sbm-bound-second-moment-large-t}} 
For $u,v\in [n]$, let $\bsaw{s}{t,u}\subseteq\bsaw{s}{t}$ the set of block self-avoiding walks having $u$ as pivot. To provide concentration we will use a similar approach to the one outlined above.
\begin{enumerate}
	\item Find a nice set of multigraphs $\nmultig{s,t}{u,v}\subseteq\bsaw{s}{t,u}\times \bsaw{s}{t,v}$ such that
	\begin{align*}
		\underset{H\in \nmultig{s}{t,u,v}^c}{\sum} \E \Brac{\overline{U}_H(\mathbf{x})}\leq o(1)\underset{H\in \nmultig{s}{t,u,v}}{\sum} \E \Brac{\overline{\mathbf{Y}}_H}\,.
	\end{align*}
	\item Show that for such nice multigraphs
	\begin{align*}
			\underset{H\in \nmultig{s}{t,u,v}}{\sum} \E \Brac{\overline{\mathbf{Y}}_H}\leq& (1+o(1))\underset{H\in \bsaw{s}{t,u}}{\sum}\E\Brac{Y_H}^2\,,\\
			\underset{H\in \nmultig{s}{t,u,u}}{\sum} \E \Brac{\overline{\mathbf{Y}}_H}\leq& (1+o(1))\underset{H\in \bsaw{s}{t,u}}{\sum}\E\Brac{Y_H}^2
	\end{align*}
	for some universal constant $C>1$.
\end{enumerate}
Combining the two we will obtain the theorem.

\subsubsection{Additional notation} \label{sec:additional-notation}
We introduce some additional definitions which will be helpful in our discussion of block self-avoiding walks. We will introduce additional notation when needed. We suggest the  impatient reader to skip the section and come back here when needed.

For simplicity at times we write $\mathbf{X}$ for $\dyad{\mathbf{x}}$.
For a multigraph $H$ with vertex set $V(H)\subseteq [n]$ and a vertex $v\in V(H)$ we write $d^H_1(v)$ to denote the number of edges in $H$ of multiplicity $1$ incident to $v$. Similarly, we write $d^H_{\geq 2}(v)$ to denote the number of distinct edges in $H$ of multiplicity at least $2$ incident to $v$. Then $d^H(v)=d_1^H(v)+d_{\geq 2}^H(v)$, notice that $d^H(v)$ is the number of \textit{distinct} edges incident to $v$.

\begin{definition}[Underlying graph]\label{def:underlying-graph}
	Let $H=(V,M)$ be a multigraph. Let $G=(V,E)$ be the graph with vertex set $V(G)=V(H)$ and edge set $E(G)$ such that $\Set{i,j}\in E(G)$ if there exists an edge $e\in M(H)$ with endpoints $i,j$. We call $G=(V,E)$ the underlying graph of $H$ and denote it with $G(H)$.
\end{definition}

\begin{definition}
	For integers $v\leq m$ we denote by $\cT(m,v)$ the set of non-isomorphic trees 
	(picking one arbitrary representative per class) on $m$ vertices with $v$ leaves.
\end{definition}

\begin{definition}[Extension Set] \label{def:extension-set}
	Let $G$ be a graph on $m$ vertices and $r$ edges. 
	For any $q \in \Set{r, m(m+1)/2}$, let $\cG(G, q)$ be the set containing a representative graph from each isomorphic class of graphs obtained from $G$ by adding \textit{exactly} $q-r$ edges.
	We call $\cG(G,q)$ the $q$-extension set of $G$. Notice that, trivially $\cG(G,r)=\Set{G}\,.$
\end{definition}

\begin{figure}[!ht]
	\centering
	\begin{tikzpicture}[roundnode/.style={draw,shape=circle,minimum size=1mm}]
		%Nodes
		\node[roundnode]      (v1)        {};
		\node[roundnode]      (v3)       [right= of v1] {};
		\node[roundnode]      (v4)       [below =of v3] {};
		\node[roundnode]      (v5)       [right = of v4] {};
		\node[roundnode]      (v6)       [above  = of v5] {};
		\node[roundnode]      (v7)       [ right =of v6] {};
		\node[roundnode]      (v8)		 [below right = of v6] {};
		\node[roundnode]      (v9)		 [left = of v4] {};
		\node[roundnode]      (v10)		 [below left = of v4] {};
		\node[roundnode]      (v11)		 [below = of v5] {};
		
		\node[roundnode]      (u1)        [right=4cm of v7] {};
		\node[roundnode]      (u3)       [right= of u1] {};
		\node[roundnode]      (u4)       [below =of u3] {};
		\node[roundnode]      (u5)       [right = of u4] {};
		\node[roundnode]      (u6)       [above  = of u5] {};
		\node[roundnode]      (u7)       [ right =of u6] {};
		\node[roundnode]      (u8)		 [below right = of u6] {};
		\node[roundnode]      (u9)		 [left = of u4] {};
		\node[roundnode]      (u10)		 [below left = of u4] {};
		\node[roundnode]      (u11)		 [below = of u5] {};
		%Lines
		\draw[] (v1) -- (v3);
		\draw[] (v3) -- (v4);
		\draw[] (v3) -- (v5);
		\draw[] (v10) -- (v4);
		\draw[] (v9) -- (v4);
		\draw[] (v11) -- (v5);
		\draw[] (v5) -- (v6);
		\draw[] (v6) -- (v7);
		\draw[] (v6) -- (v8);
		
		\draw[] (u1) -- (u3);
		\draw[] (u3) -- (u4);
		\draw[] (u3) -- (u5);
		\draw[] (u10) -- (u4);
		\draw[] (u9) -- (u4);
		\draw[] (u11) -- (u5);
		\draw[] (u5) -- (u6);
		\draw[] (u6) -- (u7);
		\draw[] (u6) -- (u8);
		\draw[] (u4) -- (u11) [dashed];
	\end{tikzpicture}
	\caption{Example of tree $T$ on $9$ edges and a graph $G\in \cG(T,10)$.}
\end{figure}
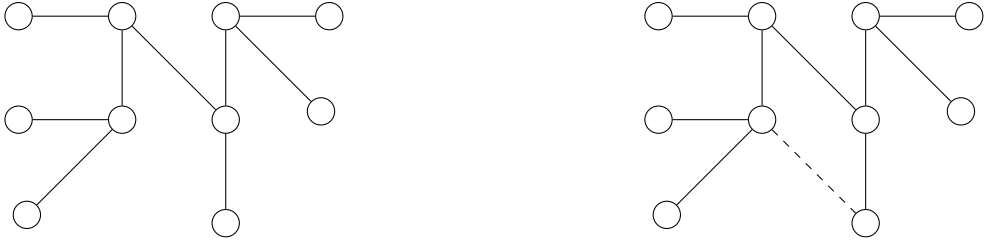

\subsection{Lower bound for non-centered Schatten norm}\label{sec:lowerbound-Schatten-norm}

We prove here the following theorem, which implies the first half of \cref{thm:sbm-schatten-norm}.

\begin{theorem}\label{thm:truncation-schatten-norm-lower-bound}
	Consider the settings of \cref{thm:sbm-schatten-norm}. Then
	\begin{align*}
		\underset{H \in \bsaw{s}{t}}{\sum}\E \Brac{\overline{\mathbf{Y}}_H} \geq  \frac{1}{10n^{\frac{2}{50A}}}\cdot \Paren{\frac{\eps\cdot d}{2}}^{st}\cdot \Paren{ 1-n^{-\frac{1}{10}}-(1+\delta)^{-t\sqrt{s}}}\,.
	\end{align*}
	%Furthermore
	%\begin{align*}
	%	\underset{H \in \bsaw{s}{t}}{\sum}\E \Brac{\overline{\mathbf{Y}}_H}\leq 
	%	10n^{\frac{2}{50A}}\cdot\Paren{\frac{\eps\cdot d}{2}}^{st}\cdot (C^*)^{t}\cdot \Paren{ 1+2n^{-\frac{1}{10}}+\frac{1}{(1+\delta)^{\sqrt{s}\cdot t}}}\,,
	%	\end{align*}
	%where $C^{*}$ is a fixed value satisfying $1< C^* \leq \Paren{100s}^{5}$.
\end{theorem}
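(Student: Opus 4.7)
The plan is to implement the three-step strategy outlined in \cref{sec:proof-strategy}. The starting point is the observation that for a $(s,t)$-block self-avoiding walk $H$ whose generating walks $W_1,\dots,W_t$ are \emph{edge-disjoint}, so that each of the $ts$ edges of $H$ has multiplicity exactly $1$, \cref{fact:sbm-expectation-polynomial-2} gives
\[
\E[\mathbf{Y}_H \mid \mathbf{x}] \;=\; \Paren{\tfrac{\eps d}{2n}}^{st}\,\prod_{k\in[t]} \mathbf{x}_{i_k}\mathbf{x}_{i_{k+1}} \;=\; \Paren{\tfrac{\eps d}{2n}}^{st},
\]
because the product over edges within a single self-avoiding walk $W_k$ telescopes to $\mathbf{x}_{i_k}\mathbf{x}_{i_{k+1}}$ and the resulting product over the cycle of pivots collapses to $1$. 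Since the number of such edge-disjoint BSAWs is of order $n^{ts}$, their (conditional) contribution is already $\approx (\eps d/2)^{st}$, which matches the target lower bound.

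First, I would define the ``nice'' subclass $\nbsaw{s}{t}\subseteq \bsaw{s}{t}$ consisting of those BSAWs $H$ such that (i) the $t$ generating self-avoiding walks are pairwise edge-disjoint (and hence $H$ has $ts$ distinct edges), and (ii) all vertices of $H$ would, with high probability, survive the $\Delta$-truncation when we add the random edges from $\mathbf{G}$ outside of $H$. Condition (ii) requires a careful definition, since the truncation status of a vertex depends on the whole graph $\mathbf{G}$, not just on $H$. A convenient workaround is to lower bound $\E[\overline{\mathbf{Y}}_H]$ by restricting to the event that none of the at most $ts$ vertices of $H$ has more than $\Delta/2$ neighbours in $\mathbf{G}\setminus H$; by a union bound over $|V(H)|\le ts$ vertices and a standard Chernoff bound on a Binomial$(n, d/n)$, this event has probability $\ge 1 - o(n^{-2/(50A)})$ for our choice of $\Delta$ (cf.\ \cref{eq:Delta-form}), giving the $n^{-2/(50A)}$ prefactor.

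Second, I would lower bound the contribution of $\nbsaw{s}{t}$. Combined with the calculation above, this yields a contribution of at least $\tfrac{1}{5}\cdot n^{-2/(50A)}\cdot(\eps d/2)^{st}\cdot(1-n^{-1/10})$, where the $(1-n^{-1/10})$ factor absorbs the $(1\pm o(1))$ slack coming from counting pivot tuples $(i_1,\dots,i_t)$ with all distinct vertices in the supporting self-avoiding walks (which, again by a union bound, dominates up to a lower-order term). Third, I would handle $H\in \bsaw{s}{t}\setminus \nbsaw{s}{t}$ --- namely BSAWs with at least one edge of multiplicity $\ge 2$ --- by proving the uniform upper bound $|\E[\overline{\mathbf{Y}}_H]|\le \overline{U}_H(\mathbf{x})$ (relying on the truncation to control $|\overline{\mathbf{Y}}_{ab}|^q \le O(d/n)$, analogous to \cref{fact:sbm-expectation-powers-polynomial-2}) and then summing $\overline{U}_H$ over the combinatorial classes of such ``bad'' BSAWs, organised by the number of repeated edges. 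A standard counting argument (in the spirit of the non-backtracking walk trace method of \cite{DBLP:conf/focs/BordenaveLM15}) shows that each repeated edge both reduces the counting of $H$'s by a factor $n$ and contributes a factor $d/n$ rather than $\epsilon d/(2n)$, losing a factor $(\epsilon/2)^{-1}\cdot n^{-1}\cdot n = (2/\epsilon)$ per repeated edge; combined with the Kesten--Stigum surplus $d\epsilon^2/4 \ge 1+\delta$ and exploiting that a BSAW with $\ge \sqrt{s}$ repeated edges must be ``short'', the total bad contribution is bounded by $(1+\delta)^{-t\sqrt{s}}\cdot(\eps d/2)^{st}$.

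The main obstacle is the second step: controlling the truncation loss. The delicate point is that the event ``the pivot/internal vertices of $H$ survive truncation'' is not independent of the edges of $H$, so one cannot simply multiply the unconditional probability of surviving truncation by $\E[\mathbf{Y}_H \mid \mathbf{x}]$. The cleanest way I see to handle this is to condition on the outgoing edges of $V(H)$ to $[n]\setminus V(H)$ (which are independent of $\mathbf{Y}_H$ given $\mathbf{x}$) and then use that even after conditioning on $H$, each vertex $v\in V(H)$ has at most $ts$ edges inside $H$, so that its degree remains $\le \Delta/2 + ts \le \Delta$ on the good event, giving $\overline{\mathbf{Y}}_{ab}=\mathbf{Y}_{ab}$ for all $ab\in E(H)$. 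The third step is combinatorially delicate but follows the template developed in \cref{sec:tools-bsaws} and is essentially a careful bookkeeping of the extension sets of \cref{def:extension-set}.
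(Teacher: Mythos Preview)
Your three-step outline matches the paper's strategy at the highest level, but the decomposition you propose in step one is too coarse, and this makes step three fail.

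You take ``nice'' to mean \emph{all} edges have multiplicity~$1$, and you claim that the remaining BSAWs contribute at most $(1+\delta)^{-t\sqrt{s}}\cdot(\eps d/2)^{st}$ in absolute value. This is false. Consider the simplest ``bad'' shape: a cycle of $st-2$ multiplicity-$1$ edges with a single multiplicity-$2$ ``hair'' attached at one vertex (the hair endpoint being a pivot). The number of such BSAWs is $\Theta(t\cdot n^{st-1})$ (choose which pivot is the leaf, then choose the $st-1$ distinct vertices), and each has expectation $\approx(\eps d/(2n))^{st-2}\cdot(d/n)$. Their aggregate contribution is therefore
\[
\Theta\!\Paren{t\cdot n^{st-1}\cdot \Paren{\tfrac{\eps d}{2n}}^{st-2}\cdot \tfrac{d}{n}}
\;=\;\Theta\!\Paren{\tfrac{t}{1+\delta}}\cdot\Paren{\tfrac{\eps d}{2}}^{st},
\]
which is a factor of $\Theta(\log n)$ \emph{larger} than your nice contribution --- so subtracting $\sum_{\text{bad}}|\E\overline{\mathbf{Y}}_H|$ would drive the lower bound negative. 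Your heuristic ``each repeated edge costs a factor $(2/\eps)$'' misses this: the correct per-edge ratio is $(1+\delta)^{-1}$, but the \emph{shape-count overhead} (here a factor of $t$) overwhelms it for small numbers of repeated edges. The same phenomenon persists for any fixed number $r$ of multiplicity-$2$ edges: the net ratio is of order $(\text{poly}(s,t))^{O(r)}(1+\delta)^{-r}$, and summing over $r$ gives something that is \emph{not} small.

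The paper resolves this by enlarging the nice class $\nbsaw{s}{t}$ to include exactly these ``cycle plus forest of multiplicity-$2$ trees'' BSAWs (\cref{lem:sbm-upperbound-negligible-walks}), and then proving --- via a delicate analysis of the truncation and of $\E_{\mathbf{x}}$ over the cycle structure --- that their expectation is \emph{positive} (\cref{lem:sbm-lower-bound-nice-bsaws}). Once positivity is established, these terms help rather than hurt the lower bound, and one only needs to control in absolute value the genuinely pathological BSAWs (those with cycles in $E_{\ge 2}(H)$, or $\Psi_i>\Delta$, or $\ell_i\ge 3$, etc.), for which the extra $n^{-1/5}$ factors in \cref{lem:sbm-trace-main-technical-bound} do the work.

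A smaller point: in step two you write that the ``all vertices safe'' event has probability $\ge 1 - o(n^{-2/(50A)})$ and that this ``gives the $n^{-2/(50A)}$ prefactor''. These two statements are inconsistent. In fact the safety probability is only $\approx n^{-O(K/A)}$ (not $1-o(1)$): each of the $\Theta(t)=\Theta(\log n)$ vertices is safe with probability $1-\eta$ for a fixed constant $\eta$, and $(1-\eta)^{\Theta(\log n)}$ is polynomially small. This is where the $n^{-2/(50A)}$ prefactor genuinely comes from (cf.\ the paper's computation of $P_s^H$).
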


Our proof will roughly consist of the first three steps of the strategy outlined in \cref{sec:proof-strategy}.  Specifically, we show step one in \cref{sec:upperbound-any-multigraph}. In section \cref{sec:upperbound-negligible-bsaw}  we show that $\underset{H \in \nbsaw{s}{t}^c}{\sum} \E U_H(\mathbf{x})$ is small, hence preparing the ground for an inequality of the form \cref{eq:step-3}. Finally, in \cref{sec:bounding-non-centered-Schatten-norm} we will obtain a lower bound for nice block self-avoiding walks. Taken together, these results will imply \cref{thm:truncation-schatten-norm-lower-bound}. %Then, as we will see, a simple additional counting argument will give us the second inequality.

\subsubsection{An upper bound for every multigraph}\label{sec:upperbound-any-multigraph}
In this section we show an upper bound on the expectation of $\mathbf{Y}_H$ for any multigraph $H$. In order to do this we need to introduce some definitions. Let $H$ be a \textit{multigraph} (hence possibly not a block self-avoiding walks) with vertex set $V(H)\subseteq [n]$.

\begin{definition}
	\label{def:deg1-classification}
	We classify the vertices $v\in V(H)$ according to there degree-$1$ as follows:
	\begin{itemize}
		\item If $d^H_{1}(v)\leq \tau$, we say that $v$ is $1$-small in $H$. We denote the set of $1$-small vertices in $H$ as $\mathcal{S}_1(H)$.
		\item If $d^H_{1}(v)>\tau$, we say that $v$ is $1$-large in $H$. We denote the set of $1$-large vertices in $H$ as $\mathcal{L}_1(H)$.
	\end{itemize}
\end{definition}

\begin{definition}
	\label{def:deg2-classification}
	We classify the vertices $v\in V(H)$ according to their degree-$\geq 2$ as follows:
	\begin{itemize}
		\item If $d^H_{\geq {2}}(v)\leq \frac{\Delta}{2}$, we say that $v$ is $(\geq 2)$-small in $H$. We denote the set of $(\geq 2)$-small vertices in $H$ as $\mathcal{S}_{\geq 2}(H)$.
		\item If $\frac{\Delta}{2}<d^H_{\geq{2}}(v)\leq \Delta$, we say that $v$ is $(\geq 2)$-intermediate in $H$. We denote the set of $(\geq 2)$-intermediate vertices in $H$ as $\mathcal{I}_{\geq 2}(H)$.
		\item If $d^H_{\geq 2}(v)>\Delta$, we say that $v$ is $(\geq 2)$-large in $H$. We denote the set of $(\geq 2)$-large vertices in $H$ as $\mathcal{L}_{\geq 2}(H)$.
	\end{itemize}
\end{definition}

\begin{definition}
	\label{def:E1-E2-annoying}
	We denote the set of edges of multiplicity 1 as $E_1(H)$, and denote the set of edges of multiplicity at least 2 as $E_{\geq 2}(H)$.
	An edge of multiplicity 1 is said to be \emph{annoying} if one of its end vertices is in $\mathcal{L}_1(H)$. We denote the set of annoying edges of multiplicity 1 as $E_{1}^a(H)$.
	We partition $E_{\geq 2}(H)$ into two sets:
	$$E_{\geq 2}^a(H)=\{uv\in E_{\geq 2}(H): u\notin \mathcal{L}_{\geq 2}(H)\text{ and }v\notin \mathcal{L}_{\geq 2}(H)\},$$
	and
	$$E_{\geq 2}^b(H)=\{uv\in E_{\geq 2}(H): u\in \mathcal{L}_{\geq 2}(H)\text{ or }v\in \mathcal{L}_{\geq 2}(H)\}.$$
\end{definition}

\begin{definition}
	\label{def:upper-bound-UH}
	For every \textit{multigraph} $H$, we define the quantity
	\begin{align*}
	\overline{U}_H(x)=&2n^{\frac{1}{50A}}\frac{1}{\resizebox{0.055\textwidth}{!}{$\displaystyle\prod_{v\in\mathcal{L}_{\geq 2}(H)}$} n^{\frac{1}{4}\left(d^H_{\geq 2}(v)-\Delta\right)}}\left(\frac{\epsilon d}{2n}\right)^{|E_1(H)|}\left(\frac{2d}{n}\right)^{|E_{\geq 2}^b(H)|}\\
	&\cdot \underset{v\in V(H)}{\prod}\left(\frac{6}{\epsilon}\right)^{\max\Set{2d_1^H(v)-\tau,0}}\prod_{uv\in E_{\geq 2}^a(H)}\left[\left(1+\frac{\epsilon x_ux_v}{2}\right)\frac{d}{n}+\frac{3d^2}{n\sqrt{n}} \right]\,.
	\end{align*}
\end{definition}

We can now present  a general upper bound for the expectation of block self-avoiding walks.

\begin{lemma}
	\label{lem:sbm-upper-bound-any-bsaw}
	Consider the settings of \cref{thm:sbm-schatten-norm}.
	For every multigraph $H$ with $V(H)\subseteq [n]\,, \Card{V(H)}\leq s\log n$, let $\overline{U}_H(x)$ be as in  \Cref{def:upper-bound-UH}. Then
	$$\big|\mathbb{E}\big[\overline{\mathbf{Y}}_H|x\big]\big|\leq \overline{U}_H(x),$$
	for $n$ large enough.
\end{lemma}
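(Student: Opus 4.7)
The plan is to expand $\overline{\mathbf{Y}}_H=\prod_{uv\in E(H)}\overline{\mathbf{Y}}_{uv}^{m(uv)}$ where $\overline{\mathbf{Y}}_{uv}=\mathbf{Y}_{uv}\cdot\mathbf{1}[d^{\mathbf{G}}(u)\leq\Delta,\,d^{\mathbf{G}}(v)\leq\Delta]$, and to bound the conditional expectation by handling three classes of edges separately according to the partition $E(H)=E_1(H)\cup E_{\geq 2}^a(H)\cup E_{\geq 2}^b(H)$. The core difficulty is that the truncation indicators couple edges incident to the same vertex, breaking the conditional independence that would make \cref{fact:sbm-expectation-polynomial-2} and \cref{fact:sbm-expectation-powers-polynomial-2} directly applicable. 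The plan will therefore condition carefully on the edges of $\mathbf{G}$ incident to the ``offending'' vertices before invoking these facts on the remainder.

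For edges in $E_{\geq 2}^a(H)$, both endpoints satisfy $d_{\geq 2}^H\leq \Delta/2$, so only a modest number of edges are constrained at either endpoint. Applying \cref{fact:sbm-expectation-powers-polynomial-2} edge-by-edge, together with a Chernoff-type bound showing that the truncation indicator is almost independent of $\mathbf{Y}_{uv}^{m(uv)}$, produces the product $\prod_{uv\in E_{\geq 2}^a}[(1+\tfrac{\epsilon x_u x_v}{2})\tfrac{d}{n}+\tfrac{3d^2}{n\sqrt n}]$; the extra $3d^2/(n\sqrt n)$ slack absorbs both the $o(1/n^{0.99})$ error from \cref{fact:sbm-expectation-powers-polynomial-2} and the small correlation contributed by the indicator. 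For multiplicity-$1$ edges with neither endpoint $1$-large, \cref{fact:sbm-expectation-polynomial-2} gives $\mathbb{E}[\mathbf{Y}_{uv}\mid\mathbf{x}]=\tfrac{\epsilon d}{2n}\mathbf{x}_u\mathbf{x}_v$ exactly, and this is absorbed into the factor $(\tfrac{\epsilon d}{2n})^{|E_1(H)|}$.

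For a $1$-large vertex $v$, the truncation at $v$ must kill some of its many multiplicity-$1$ $H$-edges from being realised in $\mathbf{G}$; the plan is to condition on the subset $S\subseteq E_1(H)\cap\delta(v)$ of incident multiplicity-$1$ edges that are actually present in $\mathbf{G}$, bound absent edges by $|\mathbf{Y}_{uv}|\leq 1$ rather than by $\tfrac{\epsilon d}{2n}$, and trade this additional slack against the $\tfrac{\epsilon d}{2n}$ previously accounted for. A bookkeeping argument using $|S|\leq \Delta$ then yields the correction $\prod_v(6/\epsilon)^{\max\{2d_1^H(v)-\tau,0\}}$. For $v\in\mathcal{L}_{\geq 2}(H)$, the truncation forces at least $d_{\geq 2}^H(v)-\Delta$ multiplicity-$\geq 2$ edges incident to $v$ to be absent from $\mathbf{G}$; each such absent edge contributes $(-d/n)^{m(uv)}\leq d/n$, yielding a $(d/n)^{d_{\geq 2}^H(v)-\Delta}$ factor that is far smaller than the claimed $n^{-(d_{\geq 2}^H(v)-\Delta)/4}$. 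Edges in $E_{\geq 2}^b(H)$ are then bounded individually by $2d/n$, giving $(2d/n)^{|E_{\geq 2}^b(H)|}$.

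Multiplying everything together produces the desired $\overline{U}_H(x)$ once the overall slack $2n^{1/(50A)}$ absorbs the $o(1)$ errors accumulated over the at most $s^2\log^2 n$ edges of $H$ (using $|V(H)|\leq s\log n$) and the overhead from switching between conditional and unconditional expectations. The main obstacle is the $1$-large vertex analysis: the indicator $\mathbf{1}[d^{\mathbf{G}}(v)\leq\Delta]$ couples all $n-1$ potential incident edges of $v$, not only those in $E(H)$, so one must show that conditioning on those external edges does not inflate the expected value of $\mathbf{Y}_{uv}$ for $uv\in E_1(H)$ by more than the stated correction, and that the $(6/\epsilon)^{-}$ bookkeeping survives summation over all choices of $S$. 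The calibration $\tau=As\log(6/\epsilon)$ together with $\Delta\gtrsim As\tau$ from \cref{eq:Delta-form} is precisely what makes this trade-off close.
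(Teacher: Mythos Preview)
Your proposal has a genuine gap in the treatment of multiplicity-$1$ edges whose endpoints are \emph{not} $1$-large. You write that for these edges ``\cref{fact:sbm-expectation-polynomial-2} gives $\mathbb{E}[\mathbf{Y}_{uv}\mid\mathbf{x}]=\tfrac{\epsilon d}{2n}\mathbf{x}_u\mathbf{x}_v$ exactly,'' but this is the expectation of $\mathbf{Y}_{uv}$, not of $\overline{\mathbf{Y}}_{uv}=\mathbf{Y}_{uv}\cdot\mathbf{1}[d^{\mathbf{G}}(u)\le\Delta]\cdot\mathbf{1}[d^{\mathbf{G}}(v)\le\Delta]$. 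The truncation indicator can inflate $|\mathbb{E}[\overline{\mathbf{Y}}_{uv}\mid\mathbf{x}]|$ by a factor as large as $2/\epsilon$ per edge (consider the event $d_{\mathbf{G}-G(H)}(u)=\Delta$: then $\overline{\mathbf{Y}}_{uv}\ne 0$ forces $uv\notin\mathbf{G}$, so the contribution is $\approx -d/n$ rather than $\epsilon d/(2n)$). Since there can be $\Theta(st)$ such edges, this is fatal. The paper handles this via a \emph{safeness} mechanism: call $v\in V(H)$ \emph{safe} if $d_{\mathbf{G}-G(H)}(v)+d_{G(H)}(v)\le\Delta$. On the event that both endpoints of $uv$ are safe, one has $\overline{\mathbf{Y}}_{uv}=\mathbf{Y}_{uv}$ deterministically (regardless of which $H$-edges are present), and $\mathbf{Y}_{uv}$ becomes conditionally independent of the rest of $\overline{\mathbf{Y}}_H$. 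One then conditions on the set $V$ of unsafe vertices in $\mathcal{S}_1(H)\cap\mathcal{S}_{\ge2}(H)$; the probability that a given $V$ is completely unsafe is at most $\eta^{|V|}$ with $\eta\le\tfrac{1}{As}(\epsilon/6)^\tau$ (this requires the Chernoff bound on the \emph{external} degree $d^o_{\mathbf{G}-G(H)}(v)$ together with a union bound over cross-edges), and on that event one pays the loose bound $3d/n$ per edge incident to $V$. Summing over $V$ gives the $n^{K/A}$ factor. Your proposal never isolates the external-degree event that makes this decoupling possible.

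Your treatment of $1$-large vertices is also inverted relative to the paper. You propose an elaborate conditioning on which incident $H$-edges are present and a bookkeeping trade against $|S|\le\Delta$; the paper does nothing of the sort. For every edge $uv\in E_1^a(H)$ (incident to some $v\in\mathcal{L}_1(H)$), the paper simply uses $|\overline{\mathbf{Y}}_{uv}|\le|\mathbf{Y}_{uv}|$ and $\mathbb{E}[|\mathbf{Y}_{uv}|\mid\mathbf{x}]\le 3d/n=(6/\epsilon)\cdot\tfrac{\epsilon d}{2n}$, which immediately yields the $(6/\epsilon)^{|E_1^a(H)|}$ factor. The same loose bound is applied to edges in $E_1^d(H)$ (incident to a vertex of large $(\ge 2)$-degree but small $1$-degree), and a separate counting argument shows $|E_1^d(H)|\le t/(A\log(6/\epsilon))$, giving the second $n^{K/A}$ factor. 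Similarly, for $E_{\ge2}^a(H)$ the paper does not argue that the indicator is ``almost independent''; it drops the indicator via $|\tilde{\mathbf{Y}}_{uv}|\le|\mathbf{Y}_{uv}|$ and computes $\mathbb{E}[|\mathbf{Y}_{uv}|^{m_H(uv)}\mid\mathbf{x}]$ directly. Your analysis of $\mathcal{L}_{\ge2}(H)$ and $E_{\ge2}^b(H)$ is essentially correct in spirit.
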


We prove \cref{lem:sbm-upper-bound-any-bsaw} in \cref{sec:appendix-lower-bound-non-centered}. % \cref{sec:technical-aspects}.
Notice that, for block self-avoiding walks, the expression
$$\left(\frac{\epsilon d}{2n}\right)^{|E_1(H)|}\prod_{uv\in E_{\geq 2}(H)}\left[\left(1+\frac{\epsilon x_{u}x_v}{2}\right)\frac{d}{n}+\frac{d^2}{n^2}\right]$$
is roughly the upper bound that we can get for $|\mathbb{E}[\mathbf{Y}_H|x]|$ in the non-truncated case (see \cref{sec:preliminaries}). Therefore, truncation has the effect of:
\begin{itemize}
	\item An amplification by a factor $n^{\frac{1}{50A}}$.
	\item An amplification  by a factor of $\Paren{\frac{6}{\epsilon}}^{2d_1^H(v)-\tau}$ for every vertex in $\cL_1(H)$.
	\item For every $uv\in E_{\geq 2}^b(H)$, $\left[\left(1+\frac{\epsilon x_ux_v}{2}\right)\frac{d}{n}+\frac{d^2}{n^2}\right]$ is replaced by $\frac{2d}{n}$.
	\item For every $uv\in E_{\geq 2}^a(H)$, $\left[\left(1+\frac{\epsilon x_ux_v}{2}\right)\frac{d}{n}+\frac{d^2}{n^2}\right]$ is replaced by $\left[\left(1+\frac{\epsilon x_ux_v}{2}\right)\frac{d}{n}+\frac{3d^2}{n\sqrt{n}}\right]$.
	\item A reduction by a factor of $n^{\frac{1}{4}(d^H_{\geq 2}(v)-\Delta)}$ for every $(\geq 2)$-large vertex $v$ in $H$.
\end{itemize}

\subsubsection{Walks that are not nice have negligible contributions}\label{sec:upperbound-negligible-bsaw}
With the tools developed in \cref{sec:upperbound-any-multigraph} we can now approach \cref{thm:truncation-schatten-norm-lower-bound}. Remember from \cref{sec:proof-strategy} we want to show that only a specific subset of block self-avoiding walks have large contribution to the expectation \cref{eq:sbm-trace-expansion}. The next lemma formalizes this idea.

\begin{lemma}\label{lem:sbm-upperbound-negligible-walks}
	Consider the settings of \cref{thm:sbm-schatten-norm}. Let $\nbsaw{s}{t}$ be the set of block self-avoiding walks $H$ with the following structure:
	\begin{itemize}
		\item Every edge $e\in E(H)$ satisfies $m_H(e)\leq 2\,.$
		%\item Every vertex $v\in V(H)$ satisfies $d_{1}^H(v)\in \Set{0,2}$ and $d^H\leq \Delta$.
		\item Every vertex $v\in V(H)$ satisfies $d_{1}^H(v)\in \Set{0,2}\,, d_{\geq 2}^H(v)\leq \Delta$.
		\item $E_1(H)$ is a non-empty cycle.
		\item $E_1(H)$ is a cycle on at least $t/\sqrt{A}$ edges.
		\item The edges of multiplicity 2 form a forest, i.e., $E_{\geq 2}(H)$ is a forest.
		\item Each connected component of the forest of edges of multiplicity $2$ is connected to $E_1(H)$ through a single vertex.
	\end{itemize}
	Then for $n$ large enough 
	\begin{align*}
		\underset{H \in\nbsaw{s}{t}^c}{\sum} \E \overline{U}_H(\mathbf{x})\leq \Paren{n^{-\frac{1}{6}} +\Paren{1+\delta}^{-t\sqrt{s}}}\underset{H \in \nbsaw{s}{t}}{\sum}\E \overline{U}_{H}(\mathbf{x})\,.
	\end{align*}
\end{lemma}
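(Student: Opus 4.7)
The plan is to decompose $\nbsaw{s}{t}^c$ into six subsets $\mathcal{B}_1,\dots,\mathcal{B}_6$, one per structural condition in the definition of $\nbsaw{s}{t}$, and bound $\sum_{H\in\mathcal{B}_i}\E\overline{U}_H(\mathbf{x})$ for each $i$. The baseline we compare against is a lower bound of the form $\sum_{H\in\nbsaw{s}{t}}\E\overline{U}_H(\mathbf{x})\gtrsim (\epsilon d/2)^{st}$ (up to subpolynomial factors), obtained by keeping only the ``skeleton'' contribution: for a nice $H$, every edge in $E_1(H)$ contributes a factor $\epsilon d/(2n)\cdot \mathbf{x}_u\mathbf{x}_v$ whose sign is nonnegative in expectation once we average $\mathbf{x}$ over the Eulerian cycle $E_1(H)$, and each multiplicity-$2$ forest edge contributes roughly $d/n$. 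Counting the number of such walks by choosing the pivot set, the cycle length, and the shape of the attached forest then converts $1/n^{|V(H)|}$ factors into multiplicative constants, giving the baseline.

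For each $\mathcal{B}_i$, I will use the graph-counting machinery (later sections of the paper, in particular the tools for block self-avoiding walks and multigraph enumeration) together with \cref{lem:sbm-upper-bound-any-bsaw} to dominate $\E\overline{U}_H(\mathbf{x})$ by a product of three types of factors: a combinatorial count of walks of the given shape, per-edge factors of $d/n$ or $\epsilon d/(2n)$, and truncation-related factors. Concretely: for $\mathcal{B}_1$ (some edge has multiplicity $\geq 3$), merging two edges saves one edge but costs a factor $n$ in the count while gaining a factor $d/n$, so one wins a factor $n^{-\Omega(1)}$ per such merging, well below $n^{-1/6}$; for $\mathcal{B}_2$ (some vertex has $d_1^H(v)\notin\{0,2\}$ or $d_{\geq 2}^H(v)>\Delta$), the first sub-case forces extra vertices of positive degree in $E_1(H)$ which are then suppressed by an $n^{-1/2}$ factor per ``extra pairing'', and the second sub-case is killed by the $n^{-\frac{1}{4}(d^H_{\geq 2}(v)-\Delta)}$ term in $\overline{U}_H$; for $\mathcal{B}_5$, $\mathcal{B}_6$ (forest / single-attachment violations) an excess edge in the multiplicity-$\geq 2$ part costs $n^{-1}$ in the count against $d/n$ in the value, again giving an $n^{-\Omega(1)}$ gain.

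The most delicate case, and the one that forces the $(1+\delta)^{-t\sqrt{s}}$ term rather than an $n^{-\Omega(1)}$ term, is $\mathcal{B}_3\cup\mathcal{B}_4$: $E_1(H)$ is either not a single cycle, or is a cycle of length less than $t/\sqrt{A}$. Here the combinatorial count does not lose powers of $n$, but the value does: each missing unit of $|E_1(H)|$ loses a factor $\epsilon d/(2n)$ and replaces it by at most a factor $d/n$ coming from an extra multiplicity-$\geq 2$ edge, so the ratio to the baseline is at most $(2/\epsilon)^{t-|E_1(H)|}$. Combined with the counting of how the remaining $st-|E_1(H)|$ edge-slots are packed into the multiplicity-$\geq 2$ structure, and using $d\geq(1+\delta)\,4/\epsilon^2$ to rewrite $(2/\epsilon)^2\leq d/(1+\delta)$, one obtains a suppression of $(1+\delta)^{-(t-|E_1(H)|)/2}$. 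Summing over all $|E_1(H)|<t/\sqrt{A}$ yields the claimed $(1+\delta)^{-t\sqrt{s}}$ bound by our choice $s\geq A^2$.

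The main obstacle I expect is precisely the $\mathcal{B}_4$ calculation: one must count block self-avoiding walks in which the cycle of simple edges is short, while simultaneously tracking the $(6/\epsilon)^{2d_1^H(v)-\tau}$ amplifications of \cref{def:upper-bound-UH} that kick in when some vertex is $1$-large. By the choice of $\tau=As\log(6/\epsilon)$ these amplifications are absorbed by entropy-type bounds on the number of vertices that can be $1$-large, but the bookkeeping is delicate and will be deferred to \cref{sec:appendix-lower-bound-non-centered}. The remaining cases $\mathcal{B}_1,\mathcal{B}_2,\mathcal{B}_5,\mathcal{B}_6$ are handled by essentially the same scheme but with cleaner accounting, and each yields an $n^{-\Omega(1)}\leq n^{-1/6}$ gain, completing the proof.
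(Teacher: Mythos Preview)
Your high-level strategy—decompose $\nbsaw{s}{t}^c$ by which structural condition fails and bound each piece—is the right idea and matches the paper's spirit. However, the paper organizes the decomposition differently: instead of your $\mathcal{B}_1,\dots,\mathcal{B}_6$, it first peels off the ``overlapping-vertex'' walks $\cD_{\geq 1}$ via \cref{lem:remove-overlapping-vertices}, and then handles everything else through the structured families $\cM_{\{\cF_i,\Psi_i\}}(\cB)$ and the single heavy-duty bound \cref{lem:sbm-trace-main-technical-bound}. Your cases $\mathcal{B}_1,\mathcal{B}_2,\mathcal{B}_5,\mathcal{B}_6$ are absorbed into the ``extra'' part of that lemma (the $n^{-1/5}$ penalties for $p_i\geq 1$, $q_i\geq 1$, $\ell_i>2$, $\Psi_i>\Delta$, or $B_i$ containing a cycle), so the accounting there is fine in outline.

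The genuine gap is in your $\mathcal{B}_3\cup\mathcal{B}_4$ analysis. First, a parameter error: you write ``by our choice $s\geq A^2$'', but in the paper $A\geq 1000s^3$, so this is impossible; the dependence goes the other way. Second, and more importantly, you are missing the entropy cost of the multiplicity-$\geq 2$ structure. Your heuristic gives a per-edge suppression of $(1+\delta)^{-1}$, but the number of ways to place $m-z$ forest edges contributes factors of order $\Delta^{O(m/s)}\cdot s^{O(m/s)}$ (this is exactly the $(200\Delta+s^{10}+2^{\log n/t})^{2m_i/s}$ numerator in \cref{lem:sbm-trace-main-technical-bound}). The $(1+\delta)^{-t\sqrt{s}}$ bound does \emph{not} come from ``summing over cycle lengths''; it comes from the single inequality
\[
\frac{300\Delta+s^{10}+2^{\log n/t}}{(1+\delta)^{s/4}}\;\leq\;(1+\delta)^{-10\sqrt{s}},
\]
which is precisely what forces the large lower bound on $s$ in the hypotheses of \cref{thm:sbm-schatten-norm}. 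Without explicitly invoking this balance between entropy and $(1+\delta)^{s/4}$, your $\mathcal{B}_4$ argument does not close, and this is the one place in the lemma where an $n^{-\Omega(1)}$ gain is genuinely unavailable.
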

Block self-avoiding walks in $\nbsaw{s}{t}$ are said to be \textit{nice}. Block self-avoiding walks in $\nbsaw{s}{t}^c$ are said to be \textit{negligible}.

\paragraph{Bounding negligible block self-avoiding walks with few vertices} 
We start our proof of \cref{lem:sbm-upperbound-negligible-walks} with an observation.
%There are block self-avoiding walks that are not captured by the bound in \cref{lem:sbm-trace-main-technical-bound}. However, several of these
There are many block self-avoiding walks that "\textit{clearly}" have a negligible contribution in the expectation of \cref{eq:sbm-trace-expansion}.
To get some intuition, consider the following example. Let $\cS$ (we will use this notation \textit{only} for this specific example) be the set of all block self-avoiding walks in $\bsaw{s}{t}$ which have all vertices with degree $2$ and all edges with multiplicity $1$. That is, each walk in $\cS$ is a cycle. Let $\cS'$ be instead the subset of block self-avoiding walks in which all but one vertex have degree $2$, one vertex has degree $4$ and all edges have multiplicity $1$.
Now it is immediate to see that for any $H\in \cS$ and $H'\in \cS'$ we have $\E\Brac{\overline{U}_H(\mathbf{x})}=\E \Brac{\overline{U}_{H'}(\mathbf{x})}$ but
\begin{align*}
	\underset{H' \in \cS'}{\sum} \E \Brac{\overline{U}_{H'}(\mathbf{x})} =(1\pm o(1))\cdot n^{-1}\cdot  \underset{H \in \cS}{\sum} \E \Brac{\overline{U}_{H}(\mathbf{x})}\,,
\end{align*}
where the inequality follows simply because we have $\Card{\cS}\approx n\cdot \Card{\cS'}$. 
In \cref{lem:remove-overlapping-vertices} we  formalize this and similar observations.
We introduce first additional tools.

\begin{fact}\label{fact:sbm-upperbound-removing-cycles}
	%\Tnote{To fix}
	Consider the settings of \Cref{thm:sbm-schatten-norm} and let $\Psi\geq 0$.
	Let $H\in \bsaw{s}{t}$ be a multigraph on at most $O(t)$ vertices and let $H^*$ be an induced sub-multigraph of $H$ satisfying:
	\begin{enumerate}
		\item the maximum $(\geq 2)$-degree in $H^*$ is $\Psi\geq 0\,,$
		\item all the edges in the cut $H(V(H), V(H)\setminus V(H^*))$ have multiplicity one in $H\,.$
	\end{enumerate}
	We denote $\ell,q\geq 0$ as the number of multiplicity-$1$ edges in $H^*$ and $H(V(H), V(H)\setminus V(H^*))$
	respectively. Let $Z$ be a set of vertices in $V(H^*)$ such that 
	$H(V(H^*)\setminus Z)$ has no multiplicity-$2$ cycles. Then
	\begin{align*}
		\E \overline{U}_H(\mathbf{x}) \leq & \frac{1}{4}n^{-1/25A}\Paren{\frac{6}{\epsilon}}^{2\ell+2q}\E \overline{U}_{H\Paren{V,V\setminus V(H^*)}}(\mathbf{x})\cdot \E \overline{U}_{H\Paren{V\setminus V(H^*)}}(\mathbf{x})
		 \\
		&\cdot \Paren{1+\frac{\eps}{2}}^{\Card{Z}\cdot \Psi}
		\cdot \Paren{\frac{2d}{n}}^{\Card{E^b_{\geq 2}(H^*)}}\cdot \underset{v\in V(H^*)}{\prod}\left(\frac{6}{\epsilon}\right)^{\max\Set{2d_1^H(v)-\tau,0}}\\
		&\cdot\frac{1}{\resizebox{0.055\textwidth}{!}{$\displaystyle\prod_{v\in\mathcal{L}_{\geq 2}(H^*)}$} n^{\frac{1}{4}\left(d^{H^*}_{\geq 2}(v)-\Delta\right)}}
		\left(\frac{\epsilon d}{2n}\right)^{|E_1(H^*)|}\prod_{e\in E_{\geq 2}^a(H^*)}\left[\frac{d}{n}+\frac{3d^2}{n\sqrt{n}} \right]\,,
	\end{align*}
	and
	\begin{align*}
		\E \overline{U}_H(\mathbf{x}) \geq & \frac{1}{4}n^{-1/25A}\E \overline{U}_{H\Paren{V,V\setminus V(H^*)}}(\mathbf{x})\cdot \E \overline{U}_{H\Paren{V\setminus V(H^*)}}(\mathbf{x}) \\
		&\cdot \Paren{\frac{2d}{n}}^{\Card{E^b_{\geq 2}(H^*)}}\cdot \underset{v\in V(H^*)}{\prod}\left(\frac{6}{\epsilon}\right)^{\max\Set{2d_1^H(v)-\tau,0}}\\
		&\cdot\frac{1}{\resizebox{0.055\textwidth}{!}{$\displaystyle\prod_{v\in\mathcal{L}_{\geq 2}(H^*)}$} 
		n^{\frac{1}{4}\left(d^{H^*}_{\geq 2}(v)-\Delta\right)}}\left(\frac{\epsilon d}{2n}\right)^{|E_1(H^*)|}\prod_{e\in E_{\geq 2}^a(H^*)}\left[\frac{d}{n}+\frac{3d^2}{n\sqrt{n}} \right]\,,
	\end{align*}
	%{\color{red} [R: There should be $\displaystyle\left(\frac{2d}n\right)^{|E_{\geq 2}^b(H)|}$] [R: You can simplify the expression by replacing $\displaystyle\prod_{e\in E_{\geq 2}^a(H)}\left[\frac{d}{n}+\frac{3d^2}{n\sqrt{n}}\right]$ with $\displaystyle 2\left(\frac{d}n\right)^{|E_{\geq 2}^a(H)|}$]}
\end{fact}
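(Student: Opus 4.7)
The statement is a careful bookkeeping exercise. Starting from the explicit definition of $\overline{U}_H(x)$, I would decompose each constituent factor along the partition of edges into three pieces: those internal to $H^*$, those internal to $H(V\setminus V(H^*))$, and the cut edges $H(V,V\setminus V(H^*))$. By hypothesis the cut contains only multiplicity-$1$ edges, so the exponents split cleanly as $|E_1(H)|=|E_1(H^*)|+q+|E_1(H(V\setminus V(H^*)))|$, $|E_{\geq 2}^b(H)|=|E_{\geq 2}^b(H^*)|+|E_{\geq 2}^b(H(V\setminus V(H^*)))|$ (the cut contributes nothing to $E_{\geq 2}^b$), and $\mathcal{L}_{\geq 2}(H)=\mathcal{L}_{\geq 2}(H^*)\sqcup\mathcal{L}_{\geq 2}(H(V\setminus V(H^*)))$, so the large-degree penalty $\prod n^{(d^H_{\geq 2}(v)-\Delta)/4}$ also factors over the partition. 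The $(\epsilon d/(2n))^{|E_1|}$, $(2d/n)^{|E_{\geq 2}^b|}$, and penalty factors thus match the three $\overline{U}$-contributions, while the three copies of the leading $2n^{1/50A}$ that appear on the right-hand side (one in $\overline{U}_{cut}$, one in $\overline{U}_{H(V\setminus V(H^*))}$, one implicit in $\overline{U}_H$) combine to produce the constant $\tfrac{1}{4}n^{-1/25A}$.

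The one place where the factorization is not clean is the vertex-degree factor $\prod_{v\in V(H)}(6/\epsilon)^{\max\{2d_1^H(v)-\tau,0\}}$, because $d_1^H(v)=d_1^{H^*}(v)+d_1^{cut}(v)+d_1^{H(V\setminus V(H^*))}(v)$ mixes the three regimes. For the upper bound I would use the elementary inequality
\[
\max\{2(a+b)-\tau,0\}\leq \max\{2a-\tau,0\}+2b\qquad(a,b\geq 0)
\]
to peel off the cut contribution at complement vertices and the $H^*$-internal contribution at cut vertices; summing the residual $2b$ terms over the corresponding vertex sets uses the identity $\sum_{v\in V(H)\setminus V(H^*)} d_1^{cut}(v)=q$ and an analogous count involving $\ell$, producing the multiplicative slack $(6/\epsilon)^{2\ell+2q}$. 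For the lower bound I would instead use the trivial monotone estimate $\max\{2(a+b)-\tau,0\}\geq\max\{2a-\tau,0\}$, so no $(6/\epsilon)$ slack appears.

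The remaining $x$-dependence comes from $E_{\geq 2}^a(H^*)$ and $E_{\geq 2}^a(H(V\setminus V(H^*)))$, and since the random bits $\mathbf{x}_v$ for $v\in V(H^*)$ are independent of those for $v\in V\setminus V(H^*)$ (and $\overline{U}_{cut}$ is deterministic in $\mathbf{x}$), the expectation factors over the three pieces. The heart of the argument is therefore to control
\[
\E\prod_{e=uv\in E_{\geq 2}^a(H^*)}\Bigl[(1+\tfrac{\epsilon \mathbf{x}_u\mathbf{x}_v}{2})\tfrac{d}{n}+\tfrac{3d^2}{n\sqrt{n}}\Bigr].
\]
Writing each factor as $A_e+B_e\mathbf{x}_u\mathbf{x}_v$ with $A_e=d/n+3d^2/(n\sqrt n)$ and $B_e=\epsilon d/(2n)$ and expanding, the expectation collapses to $\sum_{S\;\text{Eulerian}}\prod_{e\in S}B_e\prod_{e\notin S}A_e$. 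For the lower bound the $S=\emptyset$ term alone yields $\prod_e A_e$ and all other contributions are non-negative, giving exactly the required bound. For the upper bound, crudely replace every $e\in E_{\geq 2}^a(H^*)$ incident to $Z$ by $(1+\epsilon/2)A_e$; there are at most $|Z|\cdot\Psi$ such edges, which explains the $(1+\epsilon/2)^{|Z|\Psi}$ factor, and by the defining property of $Z$ the remaining $E_{\geq 2}^a(H^*)$-edges lie in $V(H^*)\setminus Z$ and contain no multiplicity-$\geq 2$ cycle, so the only Eulerian subset in the reduced sum is $S=\emptyset$, again yielding $\prod_e A_e$.

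\textbf{Main obstacle.} The proof is primarily accounting, but the delicate step is the vertex-degree decomposition in the second paragraph: the non-additivity of $\max\{\cdot,0\}$ forces a suboptimal split that costs a $(6/\epsilon)^{2\ell+2q}$ factor in the upper direction, and one must verify that summing the residual terms along the cut yields precisely $2q$ (respectively $2\ell$). The second subtle point is the $Z$-argument: one must use the hypothesis that $H(V(H^*)\setminus Z)$ has no multiplicity-$2$ cycle to rule out nonempty Eulerian subsets after removing the $Z$-incident edges, and the bound $|Z|\cdot\Psi$ on the number of $Z$-incident edges uses that $\Psi$ upper-bounds the $(\geq 2)$-degree in $H^*$.
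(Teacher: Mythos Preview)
Your proposal is correct and follows essentially the same approach as the paper. The paper modularizes slightly by first stating a separate splitting fact (\cref{fact:splitting-upper-bound}/\cref{fact:splitting-upper-bound-second-moment}) that gives the three-way factorization $\E\overline{U}_H\lessgtr \tfrac14 n^{-1/25A}(6/\epsilon)^{2\ell+2q}\E\overline{U}_{\mathrm{cut}}\cdot\E\overline{U}_{H(V\setminus V(H^*))}\cdot\E\overline{U}_{H^*}$, and then separately bounds $\E\overline{U}_{H^*}(\mathbf{x})$ by the $Z$-removal argument; you do both steps at once, but the substance---the peeling of the vertex-degree $\max$ terms at cost $(6/\epsilon)^{2\ell+2q}$, and the Eulerian-subset expansion of $\E\prod_{e}[A_e+B_e\mathbf{x}_u\mathbf{x}_v]$ handled by stripping the $\leq|Z|\Psi$ edges incident to $Z$ so the remaining graph has no multiplicity-$2$ cycles---is identical.
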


We prove \cref{fact:sbm-upperbound-removing-cycles}  in \cref{sec:splitting-expectations}. 

We now temporarily limit our analysis to block self-avoiding walks with bounded maximum degree-$(\geq 2)$. As observed in \cref{sec:upperbound-any-multigraph}, walks with large degree-$(\geq 2)$ will have small contribution to \cref{eq:sbm-trace-expansion} and will be easily bounded. Let
\begin{align*}
	\bsaw{s}{t,d_{\geq 2}^H\leq \Delta}&:= \Set{H\in \bsaw{s}{t}\suchthat\max_{v\in v(H)}d_{\geq 2}^H(v)\leq \Delta}\,,
	%\bsaw{s}{t,m_H\leq 2}&:=\Set{H \in \bsaw{s}{t}\suchthat \forall e \in M(H), m_H(e)\leq 2}\,.
\end{align*}
we require another definition.

\begin{definition}\label{def:overlapping-vertices}
	Let $H\in \bsaw{s}{t, d_{\geq 2}^H\leq \Delta}$ and let $v\in V(H)$. We denote by $q_H(v)$ the number of connected components of the line graph  with vertex set $E_H(v)$ and such that there is an edge between $e,e'\in E_H(v)$ if and only if $e,e'$ appear in the sequence of edges $M(\cW(H))$ consecutively. Let $q_H:=\underset{v\in V(H)}{\sum}(q_H(v)-1)\,.$
	Now, for $q\geq 0$ we define $\cD_{q,s,t}\subseteq\bsaw{s}{t, d_{\geq 2}^H\leq \Delta}$ to be the subset of block self-avoiding walks  $H$ with $q_H=q\,.$
	We also write $\cD_{q,s,t}(H)\subseteq V(H)$ to be the set of vertices in $H$ with $q_H(v)\geq 2\,.$
	Finally we write
	\begin{align*}
		\cD_{\geq 1,s,t} = \underset{q\geq 1}{\bigcup} \cD_{q,s,t}\,.
	\end{align*}
	When the context is clear we write $ \cD_{q}$ instead of $\cD_{q,s,t}$.
\end{definition}

We can now prove that block self-avoiding walks in $\cD_{\geq 1}$ have negligible contribution to the expectation of \cref{eq:sbm-trace-expansion}.

\begin{lemma}\label{lem:remove-overlapping-vertices}
	Consider the settings of \cref{thm:sbm-schatten-norm}.
	Then for $n$ large enough 
	\begin{align*}
		\underset{H \in \cD_{\geq 1}}{\sum} \E\overline{U}_H(\mathbf{x})\leq \frac{1}{n^{2/3}} \underset{H \in \bsaw{s}{t, d_{\geq 2}^H\leq \Delta}\setminus \cD_{\geq 1}}{\sum} \E\overline{U}_H(\mathbf{x})\,.
	\end{align*}
	\begin{proof}
		Fix $q\geq 1$ and consider the following procedure  to obtain a block self-avoiding walk in $\bsaw{s}{t,d_{\geq 2}^H\leq \Delta}\setminus\cD_{q}$ from a block self-avoiding walk $H\in \cD_{q}$. 
		Let $M(\cW)$ be the sequence of edges obtained concatenating the generating self-avoiding walks of $H$ so that the subsequence $\Set{e^{(\ell-1)\cdot s+1},\ldots, e^{(\ell-1)\cdot s+s}}$ corresponds to the  $\ell$-th generating self-avoiding walk of $H$ (for simplicity we let $i-1=st$ for $i=1$ and analogously we let $i+1=1$ for $i=st$).
		Let $v\in \cD_q(H)$. Let $F_{H,v}$ be the line graph with vertex set $E_H(v)$ and edges as described in \cref{def:overlapping-vertices}.
		Let $E_H(v)^1$ be an arbitrary connected component of $F_{H,v}$ and let $u$ be a vertex not in $H$.  %its connected components
		%with edge partition $E^1_H(v),E^2_H(v)$ as defined in \cref{def:overlapping-vertices} and let $u$ be a vertex not in $H$.
		We construct the block self-avoiding walk $H'\in \cD_{q-1}$  with $V(H')=V(H)\cup\Set{u}$ applying the following operation on $H$: 
		\begin{itemize}
			\item Consider the sequence of edges $M(\cW(H))$, we replace  every edge $v w\in M(\cW(H))$ (and $wv\in M(\cW(H))$) such that $v w\in E^1_H(v)$ with the edge $uw$ (resp. $wu$).
		\end{itemize}
		Clearly, $H'\in \bsaw{s}{t,d_{\geq 2}^H\leq \Delta}\setminus\cD_{q}$ and   $\Card{V(H')}-\Card{V(H)} = 1$. Furthermore $\Card{E_1^a(H')}\geq \Card{E^a_1(H)}-\tau-2$. Thus 
		\begin{align*}
			\underset{v\in V(H)}{\prod}\left(\frac{6}{\epsilon}\right)^{\max\Set{2d_1^H(v)-\tau,0}}\leq& \Paren{\frac{6}{\eps}}^{\tau+2}\cdot \underset{v\in V(H')}{\prod}\left(\frac{6}{\epsilon}\right)^{\max\Set{2d_1^{H'}(v)-\tau,0}}\,.
		\end{align*}
		By  \cref{fact:sbm-upperbound-removing-cycles} it follows that
		\begin{align}
			\label{eq:decrease-overlapping-vertices}
			\frac{1}{n^{\Card{V(H')}-\Card{V(H)}}}\cdot \frac{\E\Brac{ \overline{U}_H(\mathbf{x})}}{\E \Brac{\overline{U}_{H'}(\mathbf{x})}}\leq \frac{1}{n}\cdot \Paren{\frac{6}{\eps}}^{2\tau}\Paren{1+\frac{\eps}{2}}^{2\Delta}\leq \frac{1}{n}\cdot \Paren{\frac{12}{\eps}}^{3\Delta}\,.
		\end{align}
		To obtain a multi-graph not in $\cD_{\geq 1}$ we repeatedly apply the operation above until $\cD_q(H)$ is empty. Notice that $(st)^{O(q)}$  applications suffice. 
		It remains to show that the contribution to the expectation of \cref{eq:sbm-trace-expansion} of block self-avoiding walks in $\cD_{\geq 1}$ is negligible. For this,  observe that at each step there are at most $(st)^2$ block self-avoiding walks that can produce the same multigraph $H'$.
		So using \cref{eq:decrease-overlapping-vertices}, we get for any $q\geq 1$
		\begin{align*}
			\underset{H\in \cD_{q}}{\sum}\E \overline{U}_H(\mathbf{x}) \leq \frac{(st)^2}{n^{0.99}}\underset{H' \in \cD_{q-1}}{\sum}\E \overline{U}_H(\mathbf{x})\,.
		\end{align*}
		The result follows since the maximum degree in any block self-avoiding walk is $2t$.
	\end{proof}
\end{lemma}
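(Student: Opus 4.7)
The plan is an injection-with-bounded-multiplicity argument. For each $q\geq 1$ I will construct a surgery map $\Phi_q\colon\cD_q\to\cD_{q-1}$ that eliminates a single overlap by splitting one vertex into two (the original and a fresh one), with two key properties: (i) the map decreases the expected upper bound $\E\overline{U}_H(\mathbf{x})$ by roughly a factor $n$ relative to the summation gain from one additional vertex, and (ii) the pre-image of any $H'\in\cD_{q-1}$ under $\Phi_q$ has size at most $(st)^2$. Taken together, the per-step ratio will satisfy $\sum_{H\in\cD_q}\E\overline{U}_H(\mathbf{x})\leq (st)^2 n^{-1} C_{s,\eps,d}\cdot \sum_{H'\in\cD_{q-1}}\E\overline{U}_{H'}(\mathbf{x})$, which is at most $n^{-0.99}\sum_{H'}\E\overline{U}_{H'}(\mathbf{x})$ since $s,\tau,\Delta$ are all constants in $n$. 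Iterating from $\cD_q$ down to $\cD_0$ and then summing over $q\geq 1$ produces a geometric series whose total is at most $n^{-2/3}$ for $n$ large, yielding the lemma.

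Concretely, given $H\in\cD_q$ I will pick any overlap vertex $v\in\cD_q(H)$, i.e.\ one with $q_H(v)\geq 2$ connected components in the edge-consecutivity line graph $F_{H,v}$ from \cref{def:overlapping-vertices}. Choose any such component, call its edge set $E_H(v)^1$, introduce a fresh vertex $u\notin V(H)$, and define $H'=\Phi_q(H)$ by rewriting the edge sequence $M(\cW(H))$: every occurrence of an edge $vw$ (or $wv$) with $vw\in E_H(v)^1$ is replaced by $uw$ (resp.\ $wu$). The essential structural feature is that edges within a single connected component of $F_{H,v}$ appear in \emph{consecutive} positions of $M(\cW(H))$, so the rerouting preserves self-avoidance of each length-$s$ generating walk; hence $H'\in\bsaw{s}{t}$ with $d_{\geq 2}^{H'}\leq\Delta$, the overlap count at $v$ drops by one while no other vertex is affected, giving $H'\in\cD_{q-1}$ with $|V(H')|=|V(H)|+1$. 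For the pre-image count, to recover $H$ one only needs to specify which two vertices of $H'$ are to be glued back together, which yields at most $|V(H')|^2\leq (st)^2$ pre-images per $H'$.

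For the ratio bound I will apply \cref{fact:sbm-upperbound-removing-cycles} twice---on the star at $v$ in $H$ and on the star at $u$ in $H'$---to isolate the affected local contributions from the unchanged bulk of each walk, reducing the comparison to a local computation. The two local contributions should differ by at most $n^{-1}\cdot (6/\eps)^{2\tau}(1+\eps/2)^{2\Delta}$: the $n^{-1}$ arises because $|V(H')|=|V(H)|+1$; the $(6/\eps)^{2\tau}$ absorbs at most $\tau+2$ degree-$1$ edges possibly flipping between annoying and non-annoying under the surgery; and $(1+\eps/2)^{2\Delta}$ controls the product over truncated $(\geq 2)$-edges at the star. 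This establishes \cref{eq:decrease-overlapping-vertices}, after which the iteration described in the first paragraph closes the argument. The main technical obstacle will be verifying the structural claim that each connected component of $F_{H,v}$ really corresponds to a contiguous sub-sequence of $M(\cW(H))$---so that splitting it off to a fresh vertex preserves both block-self-avoidance and the $(\geq 2)$-degree truncation---together with the careful annoying/non-annoying bookkeeping needed to apply \cref{fact:sbm-upperbound-removing-cycles} cleanly in both directions of the comparison.
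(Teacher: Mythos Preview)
Your proposal is correct and matches the paper's proof essentially line for line: the same vertex-splitting surgery $\Phi_q$ based on a connected component of $F_{H,v}$, the same $(st)^2$ pre-image bound (via identifying the two vertices to re-merge), the same local ratio bound via \cref{fact:sbm-upperbound-removing-cycles} yielding $(6/\eps)^{2\tau}(1+\eps/2)^{2\Delta}/n$, and the same geometric iteration down to $\cD_0$. One small clarification on your stated ``technical obstacle'': what is actually needed is not that a component of $F_{H,v}$ occupies a contiguous block of $M(\cW(H))$, but only that any two \emph{consecutive} edges at $v$ lie in the same component---which is immediate from the definition of $F_{H,v}$ and is exactly what guarantees the rerouted sequence remains a valid block self-avoiding walk.
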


\paragraph{Bounding block self-avoiding walks from their shape and edges multiplicities}
Next we develop a general bound on the contribution of block self-avoiding walks based  on the shape of their underlying graph and the multiplicity of each edge. 
Together with  \cref{lem:remove-overlapping-vertices} this will be enough to obtain \cref{lem:sbm-upperbound-negligible-walks}.
We will need the following definitions. %Recall that for a graph $G$, we define $V(G)$ as its set of vertices.
\begin{definition}
	For a collection of disjoint connected graphs on at least two vertices $\cB=\Set{B_1,\ldots,B_z}$, we define the set $\cM_{s,t}(B_1,\ldots,B_z)$ to be the subset of $\bsaw{s}{t}$ satisfying the following: if $H\in \cM_{s,t}(B_1,\ldots,B_z)$, then for any $B\in \cB$
	\begin{enumerate}[(i)]
		\item $B\subseteq G(H)$, we denote with $B'$ a (arbitrary) copy of $B$ in $H$ and by $H(B)$ the multigraph induced by $V(B')$,
		\item $\forall e\in E(H(B))$ that is also an edge in $B$, $m_H(e)\geq 2$,
		\item there exists a cut $H(V(B^{\prime}),V(H)\setminus V(B^{\prime}))$ in $H$ such that each edge in the cut has multiplicity $1$ in $H$. 
	\end{enumerate}
	and furthermore
	\begin{enumerate}[(iv)]
		\item the copies $B_1^{\prime},B_2^{\prime},\ldots,B_z^{\prime}$ are disjoint
		\item[(v)] every edge in $H\Paren{V(H)\setminus \Paren{\underset{j \in [z]}{\bigcup}V(B_j^{\prime})}}$ has multiplicity $1$.
	\end{enumerate}
	With a slight abuse of notation, we will simply write $H(B)$ instead of $H(B')$.
\end{definition}

\begin{definition}
	\label{def:generic-bsaw}
	Let $\cB=\Set{B_1,\ldots,B_z}$ be a collection of disjoint connected graphs, let $\Set{\ell_i,q_i,p_i,h_i}_{i=1}^z$ be a sequence of tuples of integers such that for all $i\in[z]$ $\ell_i,q_i,p_i,h_i\geq 0$. Further we denote $\mathcal{F}_i=\{\ell_i,q_i,p_i,h_i\}$.
	We write $\cM_{s,t,\Set{\mathcal{F}_i,\Psi_i}_{i=1}^z}(\cB)$ for the subset of $\cM_{s,t}(\cB)$ such that for any $i\in [z]$:
	\begin{enumerate}[(i)]
		\item the size of the cut $H(V(H)\setminus V(B_i), V(B_i))$ is $\ell_i$
		\item the number of edges in $H(B_i)$ of multiplicity one is $q_i$
		\item the number of edges $e$ in $H(B_i)$ with $m_H(e)=2$ is $h_i$ 
		\item the maximum degree-$(\geq 2)$ in $H(B_i)$ is $\Psi_i$.
		\item the edges with multiplicity larger than $2$  in $H(B_i)$  satisfy
		\begin{equation*}
			\sum_{\substack{e\in H(B_i)\\ m_H(e)\geq 3}} m_H(e)= p_i\,,
		\end{equation*}
	\end{enumerate}
\end{definition}

When the context is clear we simply write $\cM(\cB)$ and $\cM_{\Set{\mathcal{F}_i,\Psi_i}_{i=1}^z}(\cB)\subseteq \cM(B)$. For $\cB=\Set{B}$ we simply write $\cM(B)$. 
For $\cB=\emptyset$, the set $\cM(\cB)$ corresponds to the set of block self-avoiding walks in $\bsaw{s}{t}$ where all edges have multiplicity $1$. The next lemma studies the contribution of block self-avoiding walks in $\cM(\cB)$ for all $\cB$.

\begin{lemma}\label{lem:sbm-trace-main-technical-bound}
	Consider the settings of \cref{thm:sbm-schatten-norm}. Let $z\geq 1$ and $m_1,\ldots,m_z\geq 1$ be integers.
	Then for $n$ large enough,
	\begin{align*}
		\underset{\substack{\text{for }i \in [z]:\\q_i,\Psi_i,\ell_i,v_i\geq 0\\
		T_i\in \cT(m_i,v_i)\\
		r_i\geq m_i-1\\
		B_i\in \cG(T_i,r_i)\\
		h_i,p_i\geq 0}}{\sum}& \quad \underset{H\in \cM_{\Set{\cF_i,\Psi_i}_{i=1}^z}(\Set{B_1,\ldots,B_z})}{\sum} \E \overline{U}_H(\mathbf{x})\\
		&\leq \underset{i \in [z]}{\prod}\Brac{ (st)^{14}\cdot \Paren{\frac{200\Delta+s^{10}+2^{\frac{\log n}{t}}}{(1+\delta)^{s/4}}}^{2m_i/s}} \cdot \underset{H \in \cM(\emptyset)}{\sum}\E \overline{U}_H(\mathbf{x})\,.
	\end{align*}
	Furthermore, restricting the sum over $r_i> m_i-1$, or over $p_i\geq 1$, or $q_i\geq 1$, or $\Psi_i >\Delta$ or $\ell_i >  2$, for some $i\in [z]$ the inequality holds with an additional $n^{-\frac{1}{5}\Paren{p_i+q_i+\ind{\Psi_i>\Delta}(\Psi_i-\Delta)  +\ind{\ell_i>2}\Paren{\ell_i-2} }   }$ factor.
\end{lemma}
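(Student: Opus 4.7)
The plan is to reduce the sum over walks carrying the blob structures $B_1,\ldots,B_z$ to a sum over walks in $\cM(\emptyset)$ by a vertex-contraction argument, paying controlled combinatorial and analytic factors for each blob.

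First, I would fix the blobs $B_1,\ldots,B_z$ and a walk $H \in \cM_{\{\cF_i,\Psi_i\}_{i=1}^z}(\cB)$, and apply \cref{fact:sbm-upperbound-removing-cycles} iteratively (once per blob $B_i$) to factor the upper bound
\[
\E\,\overline{U}_H(\mathbf{x}) \;\lesssim\; \prod_{i=1}^{z}\Paren{\text{local blob factor for }B_i}\cdot \E\,\overline{U}_{H^\star}(\mathbf{x}),
\]
where $H^\star$ is the block self-avoiding walk obtained by \emph{contracting} each blob $B_i$ down to a single ``hub'' vertex while keeping the $\ell_i$ multiplicity-$1$ cut edges as incident edges of that hub. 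The local factor collects all terms in \cref{def:upper-bound-UH} that depend only on edges inside $H(B_i)$ (the $\left(\tfrac{\epsilon d}{2n}\right)^{q_i}$ from mult-$1$ edges inside the blob, the $\left(\tfrac{d}{n}+\tfrac{3d^2}{n\sqrt n}\right)^{h_i}$ from small-degree mult-$\geq 2$ edges, the $\left(\tfrac{2d}{n}\right)$ factors from edges touching $\cL_{\geq 2}(H(B_i))$, the $1$-large amplifications $(6/\epsilon)^{2d_1^{H(B_i)}(v)-\tau}$, and the truncation suppression $n^{-\frac14(d_{\geq 2}^{H(B_i)}(v)-\Delta)}$). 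Critically, the only part of the factorization that is sensitive to the \emph{embedding} of $B_i$ into the walk is the number of ways the $t$ self-avoiding walks traverse $B_i$, which I bound separately.

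Next comes the combinatorial step: count, for fixed blob shapes $B_1,\ldots,B_z$ with the prescribed cut-size and multiplicity data $\cF_i = \{\ell_i,q_i,p_i,h_i\}$, the number of walks $H \in \cM_{\{\cF_i,\Psi_i\}}(\cB)$ mapping to a fixed contracted walk $H^\star \in \cM(\emptyset)$. The point is that each blob $B_i$ occupies at most $2m_i + O(1)$ consecutive edges of $M(\cW(H))$ (since every edge inside the blob is re-used and the tree structure forces at most $\sim 2m_i$ steps to enter, walk over $T_i$, and leave). Choosing the $m_i$ vertex labels in $[n]$ costs $\leq n^{m_i}$; choosing where inside the $st$ edges of $H^\star$ to insert the blob costs $\leq (st)^{\ell_i}$; and choosing how the self-avoiding walks weave through the $r_i$ edges (respecting multiplicities $q_i,h_i,p_i$) costs at most $(2m_i)^{O(r_i)}\le (st)^{14\cdot \text{(blob complexity)}/m_i}$ per blob. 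Balanced against the $\left(\tfrac{\epsilon d}{2n}\right)^{s}$ and $\left(\tfrac{2n}{\epsilon d}\right)^{s}$ accounting in \cref{lem:expectation-trace-expansion} for each length-$s$ walk consumed by the blob, these factors collapse (using $\tfrac{\epsilon^2 d}{4} = 1+\delta$) to
\[
\prod_{i=1}^{z} \Paren{\frac{\mathrm{poly}(st,\Delta,2^{\log n/t})}{(1+\delta)^{s/4}}}^{2m_i/s},
\]
which is exactly the claimed form; the $(st)^{14}$ absorbs all polynomial overheads from the number of tree shapes in $\cT(m_i,v_i)$ and extensions in $\cG(T_i,r_i)$.

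Finally, for the ``furthermore'' refinements, each excess feature produces an extra $n^{-1/5}$ factor from a readily-identifiable source in the accounting above: an extra blob edge ($r_i > m_i-1$, i.e.\ a cycle inside the blob) saves a vertex choice so the $n^{m_i}$ budget drops by $n$; an edge of multiplicity $\geq 3$ (quantified by $p_i$) trades a factor of $\left(\tfrac{d}{n}+\tfrac{3d^2}{n\sqrt n}\right)$ for $\left(\tfrac{\epsilon d}{2n}\right)^{p_i}$, a power of $1/n$; a mult-$1$ edge inside the blob ($q_i \geq 1$) pays an extra $1/n$ because the blob is now connected to the ``outside'' through a vertex whose walk-count is pinned; a cut of size $\ell_i > 2$ means the walk enters and exits the blob more than once, costing extra insertions; and $\Psi_i > \Delta$ activates the truncation factor $n^{-(\Psi_i-\Delta)/4}$ directly. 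Each effect contributes independently, yielding the product of $n^{-(p_i+q_i+\cdots)/5}$ terms claimed. The main obstacle will be the blob-embedding count, because one must show that the competition between choosing $m_i$ fresh vertex labels (which scales as $n^{m_i}$) and the $\left(\tfrac{\epsilon d}{2n}\right)^{q_i}$-per-mult-$1$-edge suppression is dominated by the $(1+\delta)^{-s/4}$ signal dilution per blob; establishing this sharply requires a careful case analysis of the walk structure inside each $B_i$ parameterized by $(q_i, h_i, p_i, \Psi_i)$ and is where all the technical work lives.
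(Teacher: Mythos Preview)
Your overall intuition about where the factors $(1+\delta)^{-s/4}$ and the $n^{-1/5}$ penalties come from is correct, but the proposed mechanism (contract each blob $B_i$ to a hub vertex and compare to a walk $H^\star\in\cM(\emptyset)$) does not work as stated. The set $\cM(\emptyset)$ consists of $(s,t)$-block self-avoiding walks with exactly $st$ multiplicity-$1$ edges; after you contract a blob $B_i$ you have deleted the $2h_i+p_i+q_i$ edge-slots that the blob occupied, so $H^\star$ is a shorter walk and simply is not an element of $\cM(\emptyset)$. You allude to fixing this via the ``$(\epsilon d/2n)^s$ vs.\ $(2n/\epsilon d)^s$ accounting'', but that is not a bijection-level statement: to compare to $\sum_{H\in\cM(\emptyset)}\E\overline{U}_H(\mathbf{x})$ you would need an actual map into $\cM(\emptyset)$ with controlled fibers, e.g.\ replacing the blob by a fresh multiplicity-$1$ path of the same length and then bounding the number of blob structures that could have occupied that path. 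Your paragraph on counting (``choosing the $m_i$ vertex labels costs $n^{m_i}$'') is then inconsistent with that picture, since the fresh path already fixes those labels.

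The paper proceeds differently and sidesteps this issue. It does not contract. Instead it (i) uses \cref{fact:splitting-upper-bound} to factor $\E\overline{U}_H(\mathbf{x})$ into a piece for $H(B_z)$, the cut $H(V,V\setminus B_z)$, and the remainder $H(V\setminus B_z)$; (ii) invokes a dedicated counting lemma, \cref{lem:sbm-encoding-of-multigraph-with-B}, which directly bounds $|\cM_{\{\cF_i,\Psi_i\}}(\cB)|$ by $n^{st}\cdot\prod_i f_{s,t}(m,m_i,\cF_i,\Psi_i)\,g_{s,t}(m_i,\cF_i)$ with explicit polynomial functions $f,g$; and (iii) sums over all shapes and parameters for $B_z$ as a geometric series, reducing to the case of $z-1$ blobs. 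Iterating down to $z=0$ yields the comparison with $\cM(\emptyset)$. The $n^{-1/5}$ refinements drop out of the explicit exponents in $g_{s,t}(m_i,\cF_i)=n^{-p_i-\ell_i/2-q_i-2h_i+m_i}$ together with the $n^{-\frac14(d^H_{\geq 2}(v)-\Delta)}$ truncation factor in $\overline{U}_H$. If you want to make your contraction idea rigorous, you should replace the contraction by this encoding/counting step, which is the real engine of the argument.
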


\cref{lem:sbm-trace-main-technical-bound} formalizes the following idea: for all possible collections of $z$ (possibly isomorphic) graphs $B_1,\ldots,B_z$ %(picking one arbitrary representative from each isomorphic class) 
respectively on $m_1,\ldots,m_z$ vertices, the contribution to the expectation of \cref{eq:sbm-trace-expansion} of block self-avoiding walks in $\cM(\cB)$ can be upper bounded by $\underset{H\in \cM(\emptyset)}{\sum}\E \overline{U}_H(\mathbf{x})$ times a scalar which depends on the order and the shape of the graphs. Indeed the sum on the left-hand side captures all possible choices of graphs $B_1,\ldots,B_z$ and sets $\cM_{\Set{\cF_i,\Psi_i}_{i=1}^z}(\Set{B_1,\ldots,B_z})$. Moreover the lemma implies that the contribution of block self-avoiding walks in $\cM(\cB)$ is negligible if any of the graphs in $\cB$ contains a cycle, and that among the block self-avoiding walks in $\cM(\cB)$, most of the mass is concentrated in a very specific subset of block self-avoiding walks. This last observation will be extremely useful in simplifying our analysis and prove \cref{lem:sbm-upperbound-negligible-walks}.
%More specifically, the left-hand side of the inequality stated in the lemma accounts for all possible shapes of the graph $B_z$ and all the possible block self-avoiding walks in $\cM_{\Set{\cF_i,\Psi_i}_{i=1}^{z}}(\cB\cup \Set{B_z})$. The right-hand side then corresponds to an upper bound on the contribution of multigraphs in $\cM_{\Set{\cF_i,\Psi_i}_{i=1}^{z-1}}(\cB)$ multiplied by a scalar. I
It can be observed how for certain block self-avoiding walks with few edges of multiplicity at least $2$ this bound appears very rough, however, we can bound the contribution of these walks to the expectation of \cref{eq:sbm-trace-expansion} using \cref{lem:remove-overlapping-vertices}.

Concerning the parameters,  $2(2h_j+p_j+q_j)/s+\ell_j$ is an upper bound on the maximum number of pivots of $H$ that can be in $H(B_j)$. The parameter $v_j$ (the number of degree $1$ vertices in $B_j$) has a loose correspondence with the number of vertices $u$ with $d_{\geq 2}^{H(B_j)}(u)=1$ in $H(B)$, in the sense that $v_j-2r_j\leq \Card{\Set{u \in H(B_j)\suchthat d_{\geq 2}^{H(B_j)}(u)=1 }}\leq v$. %We remark that for many choices of the parameters, the left-hand side of the inequality (and at times \textit{also} the right hand sight) will be sums over empty sets. The upper bound will still be true.
Finally, recall from section \cref{sec:additional-notation} that with $\cT(m,v)$ we denote the set of non-isomorphic (picking one arbitrary representative per class) trees on $m$ vertices and $v$ leaves. For a given tree $T$ we let $\cG(T,r)$ be the set of non-isomorphic graphs obtained from $T$ adding $r- \Card{E(T)}$ edges.

To prove \cref{lem:sbm-trace-main-technical-bound} we need some intermediate results. First, we need to count how many block self-avoiding walks are in $\cM_{\Set{\cF_i,\Psi_i}_{i=1}^{z}}(\cB)$ (for some choice of the parameters). %The result below does exactly this.
\begin{lemma}\label{lem:sbm-encoding-of-multigraph-with-B}
	Consider the settings of \cref{thm:sbm-schatten-norm}.
	Let $\cB=\Set{B_1,\ldots,B_z}$ be a collection of disjoint connected graphs each with respectively $m_1,\ldots,m_z$ vertices. %Let $\Set{\ell_i,q_i,p_i,h_i}_{i=1^z}$ be a sequence of tuples of integers such that for all $i\in[z]$ $\ell_i,q_i,p_i,h_i\geq 0$. , and let $H\in\cM_{s,t,\Set{\mathcal{T}_i,\Psi_i}_{i=1}^{z}}(\cB)$.
	Let $\Set{\cF_i}_{i=1}^z$ be a sequence of tuples of integers  as in \cref{def:generic-bsaw}.
	Let $f_{s,t},g_{s,t}$ be the functions
	\begin{align*}
		%f_{s,t}(m, m', \mathcal{F},\Psi) =& \Paren{\Psi}^{2h/s+O(q+\ell+p+1)+2h-2(m-1)}\cdot m^{4\ell+4q+8p+4h+4-4(m'-1)}\cdot (st)^{\ell/2+q}\,,\\
		f_{s,t}(m, m', \mathcal{F},\Psi) =& \Paren{\Psi}^{2h/s+10(q+\ell+p+1)+2h-2(m'-1)}\cdot (st)^{5\ell+5q+8p+4h+4-4(m'-1)}\,,\\
		g_{s,t}(m', \mathcal{F}) =& n^{-p-\ell/2-q-2h+m'}\,.
	\end{align*}
	Let $m=\underset{j \in [z]}{\sum}m_j$. %and $\Psi=\max_{j\in [z]}\max_{v\in B_j}d(v)$.
	Then there are at most $n^{st}\cdot \underset{i \in [z]}{\prod}f_{s,t}(m,m_i,\mathcal{F}_i,\Psi_i)\cdot g_{s,t}(m_i,\mathcal{F}_i)$ block self-avoiding walks in the set $\cM_{s,t,\Set{\cF_i,\Psi_i}_{i=1}^{z}}(\cB)$.
	%Moreover, if $r-m-1=p$, then $pt$ additional bits are enough. 
\end{lemma}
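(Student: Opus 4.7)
The plan is to construct an encoding of each $H \in \cM_{s,t,\Set{\cF_i,\Psi_i}_{i=1}^z}(\cB)$ from which $H$ can be uniquely recovered, and to bound the number of valid encodings by the claimed quantity. The encoding has four components: (a) a \emph{frame}, namely the embedding of each $B_i$ as a copy $B_i^{\prime}$ on $m_i$ vertices of $[n]$, contributing $\prod_{i}n^{m_i}$; (b) a \emph{schedule}, specifying which of the $st$ positions in the edge sequence $M(\cW(H))$ correspond to entries into and exits from each $B_i^{\prime}$, and which positions traverse a multiplicity-one edge incident to $B_i^{\prime}$; (c) \emph{interior transitions}, specifying at each step inside $B_i^{\prime}$ along a multiplicity-$\geq 2$ edge, which of the at most $\Psi_i$ locally incident such edges is taken; and (d) the \emph{exterior trajectory}, giving each ``free'' step of the walk a fresh vertex choice from $[n]$.

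Component (d), applied to all $st$ steps, yields the baseline factor $n^{st}$. From this I would deduct, per block, one power of $n$ for each edge-traversal whose endpoint is determined by the frame rather than freely chosen: $q_i$ traversals of multiplicity-one interior edges, $2h_i$ traversals of multiplicity-two edges, $p_i$ traversals of edges of multiplicity $\geq 3$, and $\ell_i/2$ units contributed by the in-$B_i^{\prime}$ endpoints of the $\ell_i$ cut edges (properly paired across entry and exit). Combined with the $n^{m_i}$ cost of the frame, this yields $g_{s,t}(m_i,\cF_i)=n^{m_i-p_i-\ell_i/2-q_i-2h_i}$ per block. The $(st)$-polynomial factor in $f_{s,t}$ arises from component (b): each boundary or mult-one event contributes a position choice from $[st]$ and a local configuration, giving $\poly(st)$ per event, with total exponent scaling like $5\ell_i + 5q_i + 8p_i + 4h_i$. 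The $\Psi_i$-power in $f_{s,t}$ arises from component (c): each step inside $B_i^{\prime}$ along a mult-$\geq 2$ edge chooses among at most $\Psi_i$ continuations; the number of such steps scales like $2h_i/s + \ell_i + q_i + p_i$ with constant-factor overheads, adjusted by $-2(m_i - 1)$ because fixing the skeleton tree inside $B_i$ eliminates most of the branching along its edges.

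The main obstacle is ensuring injectivity of the encoding and obtaining the tight exponents. In particular, the $-2(m_i - 1)$ correction to the $\Psi_i$-exponent requires exploiting that once a skeleton tree $T_i \subseteq B_i$ is fixed by the frame, the walk's traversal along the mult-$\geq 2$ edges of $H(B_i)$ is essentially forced along this tree and therefore costs much less than $\Psi_i^{\text{(total edges)}}$. The $(st)$-polynomial degree must be tracked carefully so that each event contributes exactly once, avoiding double-counting: for instance, a cut edge appears in the schedule (component (b)) and its exterior endpoint is a fresh vertex choice (component (d)), and must not also be charged to the interior branching (c). A further technical nuisance is the pivot structure of the BSAW: the $t$ pivots induce forced restarts every $s$ steps, which may fall inside or outside some $B_i^{\prime}$ and therefore interact with the schedule; this contributes an additional $\poly(st)$ overhead absorbed into $f_{s,t}$.
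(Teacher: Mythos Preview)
Your encoding scheme is essentially the paper's approach: fix the embedding of each $B_i$ (your frame), fix the positions in $[st]$ of the cut edges and the interior multiplicity-one edges (your schedule), choose the outside-walk vertices one at a time (your exterior trajectory), and bound the branching inside each $B_i'$ by powers of $\Psi_i$ (your interior transitions). The $n$-accounting you describe recovers exactly the paper's $g_{s,t}$ factor.

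The point where your sketch is vague and where the real work lies is component~(c), the bound on the $\Psi_i$-exponent. You write that the number of interior branching steps ``scales like $2h_i/s + \ell_i + q_i + p_i$ \ldots\ adjusted by $-2(m_i-1)$ because fixing the skeleton tree eliminates most of the branching.'' This intuition is right but does not by itself give the bound. The paper's mechanism is as follows: once the multigraph is fixed, the walk visits each vertex $v \in V(B_i')$ exactly $m_H(v)/2$ times, and at the \emph{last} visit the next edge is forced (it is the unique remaining untraversed edge), so the branching contributes at most $\prod_v \Psi_i^{\,m_H(v)/2 - 1}$. The exponent $\sum_{v}(m_H(v)/2 - 1)$ is then controlled via a separate combinatorial lemma: take a spanning tree $B^\ast$ of $B_i$, observe that every leaf of $B^\ast$ must be either a pivot of $H$, incident to a cut edge, or incident to an edge in $G(H(B_i))\setminus B^\ast$; bound the number of pivots in $V(B_i)$ by $2(q_i+2h_i+p_i)/s + 2\ell_i$; and then translate the leaf bound into a bound on $\sum_{v:\, d^{B^\ast}(v)\ge 3}(d^{B^\ast}(v)-2)$, which in turn controls $\sum_v(m_H(v)/2-1)$. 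This spanning-tree-leaf argument is the technical core and is what produces the specific exponent $(8h_i+4p_i)/s + 11q_i + 8\ell_i + 4p_i + 6(h_i - m_i + 1)$ used to verify the $f_{s,t}$ bound. Your proposal does not yet contain this step; you should isolate it as a lemma and prove it separately, since the ``skeleton tree eliminates branching'' heuristic does not directly yield the required inequality.
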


We prove \cref{lem:sbm-encoding-of-multigraph-with-B} in \cref{sec:counting-bsaw} and directly use it here. Second we introduce bounds to split the expectation of $\overline{U}_H(\mathbf{x})$ into the expectation of its components.
%The construction of $\cM(\cB)$ (for any collection $\cB$) allow us to make the following additional observation.

\begin{fact}\label{fact:splitting-upper-bound}
	Consider the settings of \cref{thm:sbm-bound-second-moment-large-t}.
	Let $\cB=\Set{B_1,\ldots,B_z}$ be a collection of disjoint connected graphs on at least $2$ vertices. 
	Then for any $H \in \cM_{\Set{\cF_i,\Psi_i}_{i=1}^{z}}(\cB)$ and $i \in [z]$
	\begin{align*}
			\E \overline{U}_H(\mathbf{x}) &\leq   \frac{1}{4}n^{-1/25A}\Paren{\frac{6}{\eps}}^{2\ell_i+2q_i}\E \overline{U}_{H\Paren{V,V\setminus B_i}}(\mathbf{x})\cdot \E \overline{U}_{H\Paren{V\setminus B_i}}(\mathbf{x})\cdot \E\overline{U}_{H(B_i)}(\mathbf{x})\,,\\
			\E \overline{U}_H(\mathbf{x}) &\geq  \frac{1}{4}n^{-1/25A}\E \overline{U}_{H\Paren{V,V\setminus B_i}}(\mathbf{x})\cdot \E \overline{U}_{H\Paren{V\setminus B_i}}(\mathbf{x})\cdot \E\overline{U}_{H(B_i)}(\mathbf{x})\,.
	\end{align*}
	%Moreover for any $x$ and $H \in \cM(\cB)\setminus \Paren{\cA_{\phi\cdot t}\cup \cD_{\geq 1}\cup \cQ_{\geq 1}}$
	%\begin{align*}
	%\E \overline{U}_H(\mathbf{x}) =  \E \overline{U}_{H\Paren{V,V\setminus \Paren{\underset{i \in [z]}{\bigcup}B_i}}}(\mathbf{x})\cdot \E \overline{U}_{H\Paren{V\setminus \Paren{\underset{i \in [z]}{\bigcup}B_i}}}(\mathbf{x})\cdot \underset{i \in [z]}{\prod}\E\overline{U}_{H(B_i)}(\mathbf{x})\,.
	%\end{align*}
\end{fact}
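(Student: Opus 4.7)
The plan is to unpack \cref{def:upper-bound-UH} of $\overline{U}_H(\mathbf{x})$ factor by factor and verify that each one splits multiplicatively across the three induced sub-multigraphs $H(B_i)$, $H(V,V\setminus B_i)$, and $H(V\setminus B_i)$, absorbing any loss into either the $\frac{1}{4} n^{-1/25A}$ prefactor or the $(6/\epsilon)^{2\ell_i+2q_i}$ factor. The key structural input driving the argument is condition (iii) of the definition of $\cM_{s,t}(\cB)$: every edge of the cut $H(V,V\setminus B_i)$ has multiplicity one, so multiplicity-$(\geq 2)$ phenomena never cross between $H(B_i)$ and $H(V\setminus B_i)$.

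First, I would establish the ``clean'' splits. The cut property immediately gives the edge partitions
\begin{align*}
E_1(H) &= E_1(H(B_i))\sqcup E(H(V,V\setminus B_i))\sqcup E_1(H(V\setminus B_i))\,,\\
E_{\geq 2}(H) &= E_{\geq 2}(H(B_i))\sqcup E_{\geq 2}(H(V\setminus B_i))\,,
\end{align*}
and an analogous split for $E_{\geq 2}^a$ and $E_{\geq 2}^b$. Moreover, for any $v\in V(B_i)$, every multiplicity-$(\geq 2)$ edge incident to $v$ lies inside $H(B_i)$, so $d^H_{\geq 2}(v) = d^{H(B_i)}_{\geq 2}(v)$ and $\cL_{\geq 2}(H)\cap V(B_i) = \cL_{\geq 2}(H(B_i))$ (similarly on the other side). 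These identities show that the factors $(\epsilon d/2n)^{|E_1(H)|}$, $(2d/n)^{|E^b_{\geq 2}(H)|}$, $\prod_{uv\in E^a_{\geq 2}(H)}[\cdot]$, and $\prod_{v\in\cL_{\geq 2}(H)} n^{-(d^H_{\geq 2}(v)-\Delta)/4}$ factor exactly across the three sub-multigraphs. The normalization $2n^{1/50A}$ appears once on the left but three times in the product of sub-expectations; the discrepancy is exactly compensated (up to an absolute constant $\tfrac14$) by the $n^{-1/25A}=n^{-2/50A}$ prefactor.

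The main obstacle, and the only non-clean factor, is $\prod_v (6/\epsilon)^{\max\{2d^H_1(v)-\tau,\,0\}}$, which fails to split because for $v\in V(B_i)$ the degree $d^H_1(v)$ pools contributions from multiplicity-one edges inside $H(B_i)$ and from the cut. Here I would apply the elementary subadditivity $\max\{2(a+b)-\tau,0\}\leq \max\{2a-\tau,0\} + 2b$ with $a=d^{H(B_i)}_1(v)$ and $b=d^{\text{cut}}_1(v)$, and the corresponding inequality on the $V\setminus B_i$ side. Summing the resulting $2b$ slack using $\sum_{v\in V(B_i)}d^{\text{cut}}_1(v)=\ell_i$ and a parallel count for the other side, together with the analogous bookkeeping needed to relate $\max\{2d_1^{\text{cut}}(v)-\tau,0\}$ on the cut multigraph to the raw degree sum (this is where the $2q_i$ contribution enters, from the multiplicity-one edges of $H(B_i)$ whose thresholds don't align after separation), yields the total loss of $(6/\epsilon)^{2\ell_i+2q_i}$ stated in the upper bound.

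For the lower bound, the argument is strictly easier: I would use the reverse inequality $\max\{2(a+b)-\tau,0\}\geq \max\{2a-\tau,0\}$ to simply discard the cut contributions to the $1$-degree, at which point the factor $(6/\epsilon)^{\max\{2d^H_1(v)-\tau,0\}}$ is termwise $\geq$ the corresponding factor in $\overline{U}_{H(B_i)}$, and analogously on the other side, and the cut multigraph contributes $(6/\epsilon)^0=1$ if we drop it. The same $\frac14 n^{-1/25A}$ absorbs the normalization mismatch, giving the lower bound with no additional $(6/\epsilon)$ factor. The only delicate point is the combinatorial tracking of the $2\ell_i+2q_i$ slack; every other step is a direct factorization following the structure of \cref{def:upper-bound-UH}.
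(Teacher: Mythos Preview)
Your approach is essentially the same as the paper's: both arguments rest on the single structural observation that the cut $H(V,V\setminus B_i)$ consists only of multiplicity-one edges, so the multiplicity-$(\geq 2)$ factors in $\overline{U}_H$ factor exactly, and only the annoying-edge factor incurs slack. The paper's proof is a three-line sketch; you fill in the factor-by-factor analysis it omits, and your treatment of the normalization prefactor and the lower bound via superadditivity of $\max\{2(\cdot)-\tau,0\}$ is correct.

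One point where your bookkeeping is looser than the paper's: your subadditivity $\max\{2(a+b)-\tau,0\}\le\max\{2a-\tau,0\}+2b$ applied at both endpoints of each cut edge naturally produces a slack of $4\ell_i$, not $2\ell_i+2q_i$, and your aside about $q_i$ entering ``from the multiplicity-one edges of $H(B_i)$ whose thresholds don't align'' does not actually explain why $q_i$ appears. The paper's argument for the exponent is a direct edge count: the only multiplicity-one edges of $H$ incident to $V(B_i)$ are the $\ell_i$ cut edges and the $q_i$ internal multiplicity-one edges of $H(B_i)$, so at most $\ell_i+q_i$ edges of $E_1^a(H)$ are incident to $H(B_i)$, and losing each such annoying edge from both sub-multigraphs costs at most $(6/\epsilon)^2$. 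This gives the stated $(6/\epsilon)^{2\ell_i+2q_i}$ directly. Your strategy reaches the same destination, but the precise exponent comes from this edge count rather than from summing cut-degree slacks vertex by vertex.
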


We are now ready to prove \cref{lem:sbm-trace-main-technical-bound}.

\begin{proof}[Proof of \cref{lem:sbm-trace-main-technical-bound}]
	Our strategy will be the following: fix some graphs $B_1,\ldots,B_{z-1}$ and some tuples $\Set{\cF_i,\Psi_i}_{i=1}^{z-1}$. Then we will show that the contribution of block self-avoiding walks in $\cM_{\Set{\cF_i,\Psi_i}_{i=1}^z}\Paren{\Set{B_1,\ldots,B_{z}}}$ for all possible choice of the graph $B_{z}$ and the parameters $\cF_z,\Psi_z$ is upper bounded by some function of $\underset{H \in \cM_{\Set{\cF_i,\Psi_i}_{i=1}^{z-1}}\Paren{\Set{B_1,\ldots,B_{z-1}}}}{\sum}\E \overline{U}_H(\mathbf{x})$.
	This function will be easy to upper bound for the set $\cM(\emptyset)$, thus by reiterating the analysis for $j=z,\ldots,j=1$ we will obtain the desired bound.
	
	Now, for simplicity, for any $H\in \cM(\cB)$ let us write $H^*=H(V\setminus(B_1\cup\ldots\cup B_z))$. Define $\sum_{j\in [z]}m_j=:m$. Recall that for any $m_z,v_z$ by \cref{fact:bound-unlabeled-trees} we have $\Card{\cT(m_z,v_z)} \leq 2v_z\cdot \Paren{8e\cdot m_z/v_z}^{2v_z}$ and that for any graph in the extension set of some tree in $T\in \cT(m_z,v)$ we have at most $m_z^2$ possible choices for each additional edge. Combining  these bounds with \cref{lem:sbm-encoding-of-multigraph-with-B} we will be able to compute the number of block self-avoiding walks at hand.
	
	We can start carrying out the computation.  Fix some graphs $B_1,\ldots,B_{z-1}$ and some tuples $\Set{\cF_i,\Psi_i}_{i=1}^{z-1}$.
	By \cref{fact:splitting-upper-bound}, for any multigraph in $\cM_{\Set{\cF_i,\Psi_i}_{i=1}^z}\Paren{\Set{B_1,\ldots,B_{z}}}$ for some $B_z, \cF_z,\Psi_z$:
	\begin{align}
		\E \overline{U}_H(\mathbf{x}) \leq   \underset{j\in [z]}{\prod}&\Brac{\Paren{\frac{6}{\eps}}^{2\ell_j+2q_j} \cdot (2n^{1/50A})^{-5/2}\cdot  \E\overline{U}_{H(B_j)}(\mathbf{x})\cdot \E \overline{U}_{H(V(H^*),B_j)} (\mathbf{x})	\cdot \underset{k\in [z], k\neq j}{\prod} \sqrt{\E \overline{U}_{H(B_j,B_k)}(\mathbf{x})}}\nonumber\\
		&\cdot 	\E \overline{U}_{H^*}(\mathbf{x})\,,\label{eq:split-expectation-MB}
	\end{align}
	where we used the squared root to avoid counting for the edges in the cut $H(B_j,B_k)$ twice and the factor $(2n^{1/50A})^{-5/2}$ appears due to the fact that we use multiple times  upper bounds of the form $\overline{U}_H(\mathbf{x})$.
	For fixed $\Set{\cF_i,\Psi_i}_{i=1}^{z-1}$ and fixed $m_z>0$ we can thus focus on bounding,
	\begin{align}
		&\underset{\substack{q_z,\ell_z\geq 0\\ \Psi_z\geq 0}}{\sum}\quad \underset{\substack{v_z\leq 2(2h_z+p_z+q_z)/s+\ell_z\\T\in \cT(m_z,v_z)}}{\sum}\quad \underset{\substack{r\geq m_z-1\\B_z \in \cG(T,r)\,:\\ \max \deg (B_z)=\Psi_z}}{\sum}\quad
		\underset{\substack{h_z,p_z\geq 0\\ H\in \cM_{\Set{\cF_i,\Psi_i}_{i=1}^{z}}(\Set{B_1,\ldots,B_{z}})}}{\sum} \nonumber \\
		&\E\overline{U}_{H^*}(\mathbf{x})\cdot \E \overline{U}_{H(B_z)}(\mathbf{x})\cdot \E \overline{U}_{H(V(H^*), B_z)}(\mathbf{x})\cdot \underset{k\in [z-1]}{\prod} \sqrt{\E \overline{U}_{H(B_z,B_k)}(\mathbf{x})}\cdot \Paren{\frac{6}{\eps}}^{2\ell_z+2q_z}\,.\label{eq:sum-many-B}
		%\underset{j \in [z-1]}{\prod \Brac{arg1}}\E \overline{U}_{H(B_z)}(\mathbf{x})\cdot \E\overline{U}_{H^*}\cdot  \sqrt{\E\overline{U}_{H(V,V\setminus V(B_z))}(\mathbf{x})}\label{eq:sum-many-B}\cdot \Paren{\frac{6}{\eps}}^{2\ell_z+2q_z}\,.
	\end{align}
	For each $i \in [z]$, we let $\ell_i=\ell_i'+\ell_i''$ where $\ell_i'$ corresponds to the number of edges in $H(V(H^*), B_i)$ and $\ell_i''$ corresponds to the number of remaining edges in the cut.
	Note that $h_z+\Card{\Set{e\in H(B_z)\,:\; m_H(e\geq 3)}} = r \leq h_z+p_z$. % and  the maximum degree in $B_z$ is $\Psi_z> \Delta$. 
	So,  for fixed $\Psi_z,q_z,h_z,p_z,\ell_z,r,v_z$ it holds by \cref{fact:sbm-upperbound-removing-cycles}%\Tnote{I am loosing something on the exponent $(\eps d/2n)^{\ell_z/2}$ because I don't distinguish between cuts $H(B_j,B_z)$}
	\begin{align}
		&\E\overline{U}_{H^*}(\mathbf{x})\cdot \E \overline{U}_{H(B_z)}(\mathbf{x})\cdot \E \overline{U}_{H(V(H^*), B_z)(\mathbf{x})}\cdot \underset{k\in [z-1]}{\prod} \sqrt{\E \overline{U}_{H(B_j,B_k)}(\mathbf{x})}\cdot \Paren{\frac{6}{\eps}}^{2\ell_z+2q_z}\nonumber\\
		&\leq \Brac{\underset{u\in V(H^*)}{\prod}\Paren{\frac{6}{\eps}}^{\max\Set{2d_1^{H^*}(u)-\tau,0}}} \cdot \Paren{\frac{\eps d}{2n}}^{\Card{E(H^*)}}\cdot \Paren{\frac{6}{\eps}}^{2\ell_z+2q_z}\cdot \Paren{\frac{\eps d}{2n}}^{q_z+\ell_z'+\ell_z''/2}\cdot \Brac{\Paren{1+\frac{1}{\sqrt{n}}}\frac{d}{n}}^r\nonumber\\
		&\quad\cdot  2^{2(r-m_z-1)\cdot \Psi_z}\cdot \underset{u\in \cL_{\geq 2}(H(B_z))}{\prod}\Brac{\Paren{\frac{2d}{n}}^{\frac{1}{4}\left(d^H_{\geq 2}(u)-\Delta\right)}}\,.\label{eq:fixed-parameters-bounding-split-expectation}
	\end{align}
	Notice that changing $\ell_z,q_z$ then the multigraph $H^*$ changes and so does $\E \overline{U}_{H^*}(\mathbf{x})$.
	Moreover by \cref{lem:sbm-encoding-of-multigraph-with-B},  since $\Psi_z\leq 2t$ it follows that the contribution to  \cref{eq:sum-many-B} of block self-avoiding walks with $\Psi_z\geq \Delta$ will be at least a factor $n^{\frac{1}{5}}$ smaller.
	
	By \cref{lem:remove-overlapping-vertices}, we may assume there are no annoying edges in $H^*$. Thus we may upper bound  \cref{eq:sum-many-B} by
	\begin{align}
		n^{st}\cdot &\Brac{\underset{j \in [z-1]}{\prod} f_{s,t}\Paren{m,m_j,\cF_j,\Psi_j}\cdot g_{s,t}(m_j,\cF_j) } \nonumber\\
		\cdot & \underset{\substack{p_z,h_z,\ell_z, q_z\geq 0\\ r\geq m_z-1\\ \Psi_z\leq \Delta\\ v_z\leq 2(2h_z+p_z+q_z)/s+\ell_z}}{\sum}\Paren{8e\cdot m_z/v_z}^{2v_z} \cdot f_{s,t}(m,m_z,\cF_z,\Psi_z)\cdot g_{s,t}(m_z,\cF_z)\\
		\cdot &%\Brac{\underset{u\in V(H^*)}{\prod}\Paren{\frac{6}{\eps}}^{\max\Set{2d_1^{H^*}(u)-\tau,0}}} \cdot 
		\Paren{\frac{\eps d}{2n}}^{st-\underset{i \in [z]}{\sum} (2h_i+p_i+q_i+\ell_i'+\ell_i''/2)}\cdot \Paren{\frac{6}{\eps}}^{2\ell_z+2q_z}\cdot \Paren{\frac{\eps d}{2n}}^{q_z+\ell_z'+\ell_z''/2}\cdot \Brac{\Paren{1+\frac{1}{\sqrt{n}}}\frac{d}{n}}^r\nonumber\\
		\cdot &\Paren{2m_z}^{2(r-m_z-1)}\cdot 2^{2(r-m_z-1)\cdot \Psi_z}%\cdot \underset{u\in \cL_{\geq 2}(H(B_z))}{\prod}\Brac{\Paren{\frac{2d}{n}}^{\frac{1}{4}\left(d^H_{\geq 2}(u)-\Delta\right)}}
		\,.\label{eq:sum-over-parameters}
	\end{align}
	By \cref{eq:fixed-parameters-bounding-split-expectation}  and \cref{lem:sbm-encoding-of-multigraph-with-B} it is easy to see that 
	 \cref{eq:sum-over-parameters} is a geometric sum. So define the quantity
	\begin{align*}
		M_{\ell'_z,\ell_z''}:=%&n^{st}\cdot \Paren{\underset{j \in [z-1]}{\prod}f_{s,t}(m,m_j, \ell_j,q_j,p_j,h_j,\Psi_z)\cdot g_{s,t}( \ell_j,q_j,p_j,h_j)} \\
		%&\qquad  \cdot 
		(st)^4\cdot &\Delta^{2m_z/s+10}\cdot (100e\cdot s)^{2(m_z/s+\ell_z)}\cdot \Paren{\frac{\eps\cdot d}{2n}}^{\ell'_z+\ell_z''/2}\cdot \Paren{\frac{d}{n}}^{m_z-1}\cdot\Paren{\frac{6}{\eps}}^{2\ell_z}\\
		&\cdot f_{s,t}(m, m_z, \Set{\ell_z,0,0,m_z-1},\Psi_z)\cdot g_{s,t}( m_z,\Set{\ell_z,0,0,m_z-1})\,.
	\end{align*}
	If, for fixed $\ell_z= \ell'_z+\ell''>0,  h_z$ we restrict the sum \cref{eq:sum-over-parameters} to the case $p_z\geq 1$ we get the upper bound
	\begin{align*}
		\leq n^{st}\cdot &\Brac{\underset{j \in [z-1]}{\prod} f_{s,t}\Paren{m,m_j,\cF_j,\Psi_j}\cdot g_{s,t}(m_j,\cF_j)} \\
		\cdot & n^{-1/3}\underset{\substack{\ell_z\geq 0\\ r\geq m_z-1\\ \Psi_z\leq \Delta\\ v_z\leq 2(2h_z+q_z)/s+\ell_z}}{\sum}M_{\ell_z}\cdot %\Brac{\underset{u\in V(H^*)}{\prod}\Paren{\frac{6}{\eps}}^{\max\Set{2d_1^{H^*}(u)-\tau,0}}} \cdot
		 \Paren{\frac{\eps d}{2n}}^{st-\underset{i \in [z]}{\sum} (2h_i+p_i+q_i+\ell_i'+\ell_i''/2)}\cdot \Paren{\frac{6}{\eps}}^{2\ell_z+2q_z}\cdot \Paren{\frac{\eps d}{2n}}^{q_z+\ell'_z+\ell_z''/2}\\ 
		\cdot &\Brac{\Paren{1+\frac{1}{\sqrt{n}}}\frac{d}{n}}^r	\cdot \Paren{2m_z}^{2(r-m_z-1)}\cdot 2^{2(r-m_z-1)\cdot \Psi_z}\,.% \underset{u\in \cL_{\geq 2}(H(B_z))}{\prod}\Brac{\Paren{\frac{2d}{n}}^{\frac{1}{4}\left(d^H_{\geq 2}(u)-\Delta\right)}}\,.\\
		%\leq (1+o(1))&%\cdot n^{st}\cdot \Paren{\underset{j \in [z-1]}{\prod}f_{s,t}(m,m_j, \ell_j,q_j,p_j,h_j,\Psi_z)\cdot g_{s,t}( \ell_j,q_j,p_j,h_j)} \\
		%&  \cdot 
		%\cdot  (100e\cdot m_z/v)^{4 v}\cdot \Paren{\frac{\eps\cdot d}{2n}}^{\ell_z/2}\cdot \Paren{\frac{d}{n}}^{m_z-1}\cdot \Paren{\frac{6}{\eps}}^{2\ell_z}\\
		%&\cdot f_{s,t}(m,m_z, \ell_z,0,1,m_z-1,\Psi_z)\cdot g_{s,t}(m_z, \ell_z,0,1,m_z-1)\\
		%\leq&(1+o(1)) \cdot n^{-\frac{1}{2}}\cdot M_{\ell_z}\,.
	\end{align*}
	Similarly, restricting the sum  to the case $r>m_z-1$ (since for any additional edge in $B$ we then have at most $m_z^2$ possible choices) or $q_z\geq 1$ we also have $n^{-\frac{1}{3}}\cdot M_{\ell_z}$.  It remains to consider \cref{eq:sum-over-parameters} for $q_z=0\,,r=m_z-1\,,p_z=0$ for any $\ell_z$. 
	To do this we study first %, which can be upper bounded by $(1+o(1))\cdot M$. 
	%Hence, it suffices to study 
	the behavior of $M_{\ell_z}$ as $\ell_z$ grows.
	We distinguish two cases depending on whether $\ell_z=0$.
	
	If $\ell_z=0$, it means that for any of the graphs considered we had $H(B)=H$ and $\cB=\Set{B}$. Thus $2m_z +p_z= st$ and we get
	\begin{align*}
		M_0 \leq %&\cdot n^{st}\cdot \Paren{\underset{j \in [z-1]}{\prod}f_{s,t}(m,m_j, \ell_j,q_j,p_j,h_j,\Psi_z)\cdot g_{s,t}( \ell_j,q_j,p_j,h_j)} \\
		& (st)\cdot (100e\cdot s)^{4t}\cdot \Paren{\frac{d}{n}}^{st/2}\\
		&\cdot f_{s,t}(st/2+1, st/2+1, \Set{0,0,0,st/2},\Delta)\cdot g_{s,t}(st/2+1, \Set{0,0,0,st/2})\\
		&\leq \Paren{100\Delta+s^{10}}^{2t}\cdot (st)^{6}\cdot n\cdot d^{st/2}\,,
	\end{align*}
	which yields a ratio between \cref{eq:sum-over-parameters} and $\underset{H\in \cM(\emptyset)}{\sum}\E \overline{U}_H(\mathbf{x})$ of 	
	\begin{align*}
		\Paren{\frac{\eps^2}{4d}}^{st/2}\cdot \Paren{200\Delta+s^{10}+C}^{2t} = \Paren{\frac{200\Delta+s^{10}+C}{(1+\delta)^{s/4}}}^{2t}\,,
	\end{align*}
	where $C=2^{2\frac{\log n}{t}}$.
	Conversely, suppose $\ell_z\geq 2$ (it must be even given that the graph is Eulerian). Then 
	\begin{align*}
		M_{\ell_z}\leq  (\Delta)^{2 m_z/2 +10\ell_z}\cdot &(st)^{6+5\ell_z}\cdot \Paren{\frac{d}{n}}^{m_z-1}\cdot \Paren{\frac{d}{n}\cdot \Paren{\frac{6}{\eps}}^4}^{\ell_z/2}\\
		\cdot   & f_{s,t}(m, m_z, \Set{\ell_z,0,0,m_z-1},\Delta)g_{s,t}(m_z, \Set{\ell_z,0,0,m_z-1})\,.
	\end{align*}
	For $\ell_z= 2$	%and since $m_z\geq \gamma \cdot t$  
	the ratio with 
	\begin{align*}
		&n^{st} \cdot 	\E \overline{U}_{H^*}(\mathbf{x}) \\
		%\underset{H \in \cM_{\Set{\cF_i,\Psi_i}_{i=1}^{z-1}}(\Set{B_1,\ldots,B_{z-1}})}{\sum} 
		&\quad\underset{j\in [z-1]}{\prod}\Brac{f_{s,t}(m,m_j,\cF_j,\Psi_j) g_{s,t}(m_j,\cF_j)\Paren{\frac{6}{\eps}}^{2\ell_j+2q_j}  \E\overline{U}_{H(B_j)}(\mathbf{x})\E \overline{U}_{H(H^*,B_j)} (\mathbf{x}) \underset{k\in [z], k\neq j}{\prod} \sqrt{\E \overline{U}_{H(B_j,B_k)}(\mathbf{x})}}		
	\end{align*}
	can be bounded by
	\begin{align*}
		(st)^{14}\Paren{\frac{200\Delta^{10}}{(1+\delta)^{s/4}}}^{2m_z/s}\,.
	\end{align*}
	We get an additional $n^{-\frac{\ell_z-2}{5}}$ factor if $\ell_z > 2$. 
	We can now reiterate the analysis on $\cM_{\Set{\cF_i,\Psi_i}_{i=1}^j}(\Set{B_1,\ldots,B_{j}})$ for $j=z-1,\ldots,1$. The result follows.
\end{proof}

\paragraph{Putting things together} We are now ready to prove \cref{lem:sbm-upperbound-negligible-walks}. 

\begin{proof}[Proof of \cref{lem:sbm-upperbound-negligible-walks}]
	We argue that if a block self-avoiding walk $H$ is not in $\nbsaw{s}{t}$, then it satisfies one of the following:
	\begin{itemize}
		\item $H\in \cD_{\geq 1}$,
		\item $H\in \cM_{\Set{\cF_i,\Psi_i}_{i=1}^z}(\cB)$ for any non-empty $\cB$ and tuples $\Set{\cF_i}_{i=1}^z$ with $p_i\geq 1$ or $q_i\geq 1$ or $\ell_i\geq 3$ or $\Psi_i>\Delta$ for some $i\in [z]$,
		\item $H\in \cM_{\Set{\cF_i,\Psi_i}_{i=1}^z}(\cB)$ for any non-empty $\cB$ containing some $B_i$ with a cycle,
		\item all edges in $H$ have multiplicity at least $2$.
	\end{itemize}
	Suppose this claim holds. The contribution of walks satisfying one of the first three bullets is negligible by  \cref{lem:remove-overlapping-vertices} and \cref{lem:sbm-trace-main-technical-bound}. Thus we only need to consider walks with all edges with multiplicity $2$. By \cref{lem:sbm-trace-main-technical-bound}, it suffices to show that 
	\begin{align}
		\label{eq:sbm-s-large-enough}
		%2\Paren{\frac{300\Delta+s^{10}+2^{\frac{\log n}{t}}}{(1+\delta)^{s/4}}}^{\phi\cdot t}\cdot 10n^{\frac{2}{50A}}\leq \frac{1}{(1+\delta)^{\sqrt{s}\cdot t}}\,.
		2\Paren{\frac{300\Delta+s^{10}+2^{\frac{\log n}{t}}}{(1+\delta)^{s/4}}}\leq \Paren{1+\delta}^{-10\sqrt{s}}\,,
	\end{align} 
	which, for $t\in\Brac{\frac{\log n}{400}\,,\frac{\log n}{100}}$, may be rewritten as 
	\begin{align*}
		s\geq %\frac{10^4}{\delta}\Paren{\max \Set{\log 300\Delta,\log s, 10\phi^{-1}, 1}}^2\,.
		\frac{10^8}{\delta}\Paren{\max \Set{\log 300\Delta,\log s,  1}}^2\,.
	\end{align*}
	By assumption on $s$, \cref{eq:sbm-s-large-enough} is satisfied and thus the desired inequality follows.%, applying \cref{lem:sbm-trace-main-technical-bound}, \cref{lem:remove-overlapping-vertices} and observing that the elements in $\underset{H \in\nbsaw{s}{t}^c}{\sum} \E \overline{U}_H(\mathbf{x})$ form a geometric sum we obtain the desired inequality.
	
	It remains to verify our claim. 
	So consider some $H$ in $\nbsaw{s}{t}^c$. Suppose that there is some vertex $v\in V(H)$ with $d_{\geq 2}^H(v)=0$ and $d_1^H(v)\geq 4$, by construction then $H\in \cD_{\geq 1}$.
	Suppose now $H\in \cM(\cB)$ for some collection of graphs $\cB=\Set{B_1,\ldots,B_z}$.
	We may assume that $H\in \cM_{\Set{\cF_i,\Psi_i}_{i=1}^z}(\cB)$ where for all $i\in [z]$, $\ell_z=2\,,q_z=p_z=0\,,, h_z=m_z-1, \Psi_z\leq \Delta$ as otherwise $H$ satisfies one of the bullet points listed above. Moreover, for any $i\in [z]$ the graph $B_i$ must be a tree.
	Now, the claim would follow if we can show that for any $i \in[z]$ $H(B_i)$ is connected to $H(V\setminus B_i)$ by a single vertex.
	Suppose this is not true, since by assumption $\ell_i=2$, the vertex-cut between $H(B_i)$ and $H(V\setminus B_i)$ must have cardinality $2$. We obtain a contradiction since this implies that $q_i+p_i\geq 1$ as $H$ is a closed walk.
	%Let $H\in \cM(\cB)$ for some non-empty collection $\cB$ of connected graphs on at least $2$ vertices. 
	%If for some $i\in [z]$,  $\ell_i >2$ then either there exists $v\in V(H)$ with $d_{1}^H(v)\geq 3$ or $H(B_i)$ is connected to the rest of the graph through at least two vertices. It follows that $H\notin \nbsaw{s}{t}$.
	%Conversely if $q_i\geq 1$, then either $E_1(H)$ is not a cycle or, again, $H(B_i)$ is connected to the rest of the graph through at least two vertices. 
	%Now suppose $H\in \cD_{\geq 1}$, then either there exists some vertex $v\in V(H)$ with $d_{1}^H(v)\geq 3$, or $H\in \cM(\cB)$ for some non-empty collection $\cB$ containing a graph $B_i$ for which $H(B_i)$ is either connected to the rest of the graph through at least two vertices or $q_i\geq 1$ or $\ell_i\geq 3$.
	%The other cases are trivial.
\end{proof}

\subsubsection{Bounding the non-centered Schatten norm}\label{sec:bounding-non-centered-Schatten-norm}
The machinery of \cref{sec:upperbound-any-multigraph} allowed us to upper bound the expectation of any block self-avoiding walk under the truncated distribution. Then in \cref{sec:upperbound-negligible-bsaw} we used such machinery to show that certain block self-avoiding walk have small contribution to the expectation of \cref{eq:sbm-trace-expansion}. In this section we prove \cref{thm:truncation-schatten-norm-lower-bound}.

Our strategy will be the following. For most nice block self-avoiding walks we will be able to lower bound the expectation. THe remaining ones will be few and have negligible contribution to the expectation of \cref{eq:sbm-trace-expansion}.
Combining these observations with \cref{lem:sbm-upperbound-negligible-walks} will yield ref{thm:truncation-schatten-norm-lower-bound}.

\paragraph{Bounds on nice block self-avoiding walks} 
We start proving additional bounds on the expectation of nice block self-avoiding walks.
We further divide the set of nice block self-avoiding walks into sets.

\begin{definition}
	For $\gamma\geq 0$ define the set
	\begin{align*}
		\nbsaw{s}{t,d^H\leq \gamma}:=\Set{H\in \nbsaw{s}{t}\suchthat \forall v \in V(H)\,, d^H(v)\leq \gamma}\,.
	\end{align*}
	%We say that $H$ is a $(\gamma)$-nice block self-avoiding walk if $H\in \nbsaw{s}{t,d^H\leq \gamma}$.
	For $m,z\geq 1$ let $\nbsaw{s}{t,m,z}$ be the subset of block self-avoiding walks $H\in\nbsaw{s}{t}$ such that the graph obtained from $G(H)$ by removing all edges of multiplicity $1$ in $H$ is a forest on $m$ vertices and $z$ components. 
	%Moreover let
	%\begin{align*}
	%	\nbsaw{s}{t, \Card{E_{\geq 2}}=r}:=\Set{H \in\nbsaw{s}{t}\suchthat \Card{E_{\geq 2}(H)}=r}\,.
	%\end{align*}
	Notice that
	if $H\in \nbsaw{s}{t,m,z}$ then $\Card{E_{\geq 2}(H)}=m-z$.%, thus $ \nbsaw{s}{t,m,z}\subseteq \nbsaw{s}{t, \Card{E_{\geq 2}}=m-z}$. 
	We also define
	\begin{align*}
		\nbsaw{s}{t,d^H\leq \gamma,m,z} :=	\nbsaw{s}{t,d^H\leq \gamma}\cap 	\nbsaw{s}{t,m,z}\\
		\,.
	\end{align*}
	Observe that for $m=z=0$ we have
	\begin{align*}
		\nbsaw{s}{t,0,0}=\nbsaw{s}{t}\cap \cM_{s,t}(\emptyset)\,.%=\nbsaw{s}{t,  \Card{E_{\geq 2}}=0}\,.
	\end{align*}
\end{definition}

For many nice block self-avoiding walks, we can lower bound the expectation.

\begin{lemma}
	\label{lem:sbm-lower-bound-nice-bsaws}
	Consider the settings of \cref{thm:sbm-schatten-norm}. Let $m,z $ be integers such that $st-2m-2z\geq t/A$. 
	If $n$ is large enough, then for every 	$H\in \nbsaw{s}{t,d^H\leq \Delta,m,z}$
	\begin{align*}
		\mathbb{E}\big[\overline{\mathbf{Y}}_H\big]\geq \overline{L}_H:=\frac{1}{3n^{\frac{1}{100 A}}}\cdot \left(\frac{\epsilon d}{2n}\right)^{|E_1(H)|}\cdot\left(\frac{d}{n}\right)^{|E_{\geq 2}(H)|}\,.
	\end{align*}
\end{lemma}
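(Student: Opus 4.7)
The plan is to first compute $\mathbb{E}[\mathbf{Y}_H]$ for the \emph{non-truncated} centered adjacency matrix and then to argue that the truncation changes this expectation by at most a factor of order $n^{-1/(100A)}$. First, condition on the labeling $\mathbf{x}$. Since the edges of $\mathbf{G}$ are conditionally independent given $\mathbf{x}$ and $H$ has every edge of multiplicity at most $2$, we can factor
\[
\mathbb{E}\big[\mathbf{Y}_H \mid \mathbf{x}\big]=\prod_{ij\in E_1(H)}\mathbb{E}\big[\mathbf{Y}_{ij}\mid \mathbf{x}\big]\cdot\prod_{ij\in E_{\geq 2}(H)}\mathbb{E}\big[\mathbf{Y}_{ij}^{2}\mid \mathbf{x}\big].
\]
Applying \cref{fact:sbm-expectation-polynomial-2} to the first product and \cref{fact:sbm-expectation-powers-polynomial-2} to the second yields, up to multiplicative error $(1\pm o(1))$, the product $\big(\tfrac{\epsilon d}{2n}\big)^{|E_1(H)|}\prod_{ij\in E_1(H)}\mathbf{x}_i\mathbf{x}_j\cdot \big(\tfrac{d}{n}\big)^{|E_{\geq 2}(H)|}\prod_{ij\in E_{\geq 2}(H)}\big(1+\tfrac{\epsilon}{2}\mathbf{x}_i\mathbf{x}_j\big)$.

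Next, I would take expectation over $\mathbf{x}$ and use the combinatorial structure of a nice block self-avoiding walk. Because $E_1(H)$ is a cycle and every vertex $v$ satisfies $d_1^H(v)\in\{0,2\}$, the product $\prod_{ij\in E_1(H)}\mathbf{x}_i\mathbf{x}_j$ is identically $1$. For the forest $E_{\geq 2}(H)$, I would peel leaves: since each connected component of this forest meets the cycle in exactly one vertex, every other leaf $u$ of the forest satisfies $d_1^H(u)=0$ and $d_{\geq 2}^H(u)=1$, so the variable $\mathbf{x}_u$ appears in precisely one factor $\big(1+\tfrac{\epsilon}{2}\mathbf{x}_u\mathbf{x}_v\big)$; averaging this factor over $\mathbf{x}_u\in\{\pm 1\}$ collapses it to $1$, and we repeat until the forest is exhausted. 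This yields
\[
\mathbb{E}[\mathbf{Y}_H]=(1\pm o(1))\Paren{\tfrac{\epsilon d}{2n}}^{|E_1(H)|}\Paren{\tfrac{d}{n}}^{|E_{\geq 2}(H)|}.
\]

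Finally, I would pass from $\mathbf{Y}$ to $\overline{\mathbf{Y}}$ by controlling the event $\mathcal{T}$ that every vertex of $V(H)$ has degree at most $\Delta$ in $\mathbf{G}$. On $\mathcal{T}$ the matrices agree on the relevant entries, while on $\mathcal{T}^{c}$ we can discard the contribution by using the deterministic bound $|\mathbf{Y}_{ij}|\leq 1$ together with the probability bound on $\mathcal{T}^{c}$: since each vertex has conditionally Binomial$(n-1,O(d/n))$ degree and $\Delta$ is chosen in \cref{eq:Delta-form} as a sufficiently large constant multiple of $d$, a Chernoff estimate gives $\Pr[d^{\mathbf{G}}(v)>\Delta]\leq n^{-\omega(1/A)}$ for each $v$, and a union bound over $|V(H)|\leq st+1=O(\log n)$ vertices shows the truncation loses at most a multiplicative factor of $n^{-1/(100A)}$. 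Collecting constants and using the hypothesis $st-2m-2z\geq t/A$ (which ensures that $|E_1(H)|\geq t/A$, so the leading factor $(\epsilon d/(2n))^{|E_1(H)|}$ dominates the truncation loss) produces the claimed bound $\overline{L}_H$.

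The main obstacle is the truncation step: the event $\mathcal{T}$ is not independent of the edges appearing in $H$, so one cannot simply multiply probabilities. The technical fix is a decoupling argument that conditions on the states of the $O(st)$ edges of $H$ and then controls the residual degrees of the vertices of $V(H)$ using the remaining $n-O(\log n)$ independent potential neighbors; this is essentially the same decoupling that underlies the analysis of $\overline{U}_H$ in \cref{sec:upperbound-any-multigraph}, and the hypothesis $\Delta=\Omega(d^4)$ built into \cref{eq:Delta-form} guarantees the Chernoff tail is small enough that the union bound survives. I expect the bookkeeping of the $(1\pm o(1))$ errors across $O(s\log n)$ edges to be the most delicate part of the argument, but with $A$ taken sufficiently large in \cref{eq:Delta-form} these error terms combine into the single factor $\tfrac{1}{3n^{1/(100A)}}$.
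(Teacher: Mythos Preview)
Your computation of $\mathbb{E}[\mathbf{Y}_H]$ for the non-truncated matrix is correct and matches the paper's use of the cycle/forest structure (essentially \cref{lem:lem_Expectation_Y_Nice}). The truncation step, however, has a genuine gap, and it is precisely the hard part of the lemma. The Chernoff claim is false: the threshold $\Delta$ in \cref{eq:Delta-form} is a \emph{constant} in $n$ (it depends only on $d,s,\epsilon,A$), so $\Pr[d^{\mathbf{G}}(v)>\Delta]\approx\Pr[\mathrm{Poisson}(d)>\Delta]$ is a fixed positive constant, not $n^{-\omega(1/A)}$. Even with the correct (constant) tail, the decoupling you sketch cannot work: controlling $\mathbb{E}[\mathbf{Y}_H\mathbb{1}_{\mathcal{T}^c}]$ forces you through $\mathbb{E}[|\mathbf{Y}_H|]$, and for each of the $|E_1(H)|=\Theta(\log n)$ multiplicity-$1$ edges you lose a factor $\mathbb{E}[|\mathbf{Y}_{ij}|]/|\mathbb{E}[\mathbf{Y}_{ij}]|\approx 4/\epsilon$, so $\mathbb{E}[|\mathbf{Y}_H|]$ overshoots $|\mathbb{E}[\mathbf{Y}_H]|$ by $(4/\epsilon)^{\Theta(\log n)}$, polynomial in $n$. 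No union bound over $O(\log n)$ vertices with constant failure probability can absorb this; the paper's discussion in \cref{subsubsec:InformalDiscussion} flags exactly this obstruction.

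The paper instead conditions on a ``well-behaved'' event $\mathcal{E}_{wb,H}$ and then on the outside degrees $\mathbf{D}^H_v$ of each $v\in V(H)$ into a \emph{balanced} subset $V_b(H,\mathbf{x})\subseteq[n]\setminus V(H)$; balance makes the law of $\mathbf{D}^H$ independent of $\mathbf{x}$, so the $\mathbf{x}$-expectation can be moved inside. When every vertex is $\mathsf{d}$-safe (i.e., $\mathsf{d}_v+d^H(v)\le\Delta$), the truncated and non-truncated expectations coincide exactly; this event has probability $P_s^H\ge n^{-2K/A}$ by \cref{lem:lem_Prob_nice_safe_lower_bound}, where the per-vertex unsafe probability $\eta$ is engineered via the choice of $\Delta$ relative to $\tau$ so that $\eta\cdot(6/\epsilon)^{\tau}\le 1/(As)$. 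For $\mathsf{d}$ where some vertex is unsafe, the paper does \emph{not} use absolute-value bounds: an explicit polynomial computation (\cref{lem:lem_lemma_pleasant_each_agreeable_all_types}, \cref{lem:lem_technical_lemma_pleasant_wb_cond_d}) shows that as long as the cycle still contains one safe multiplicity-$1$ edge, the contribution remains \emph{non-negative}, while the residual configurations (no safe edge on the cycle) have probability at most $(2\epsilon/33)^{Ast}$, small enough to absorb the $(16/\epsilon)^{st}$ loss there. This sign control on the partially-unsafe region is the idea missing from your proposal.
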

We present the proof of \cref{lem:sbm-lower-bound-nice-bsaws}  in \cref{sec:appendix-bounds-nice-multigraphs}, and directly use the result.
For the remaining nice block self-avoiding walks, the next two results upper bound their expectation.

\begin{lemma}
	\label{lem:sbm-upper-bound-nice-bsaws-large-degree}
	Consider the settings of \cref{thm:sbm-schatten-norm}. 
	If $n$ is large enough, then for any $H\in \nbsaw{s}{t}\setminus\nbsaw{s}{t,d^H\leq \Delta}$
	\begin{align*}
		\Abs{	\mathbb{E}\big[\overline{\mathbf{Y}}_H\big]}\leq \frac{4}{n^{1/12}}\cdot \Paren{\frac{\eps d}{2n}}^{\Card{E_1(H)}}\cdot \Paren{\frac{d}{n}}^{\Card{E_{\geq 2}(H)}}\,.
	\end{align*}
\end{lemma}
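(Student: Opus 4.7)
The plan is to exploit the fact that the truncation at the high-degree vertex is incompatible with its full $G(H)$-neighborhood being present in $\mathbf{G}$. First, I would use the nice structure to characterize the ``bad'' vertex: since $H$ satisfies $d_1^H(v)\in\{0,2\}$ and $d_{\geq 2}^H(v)\leq\Delta$ for every $v$, any $v_0$ with $d^H(v_0)>\Delta$ must have $d_1^H(v_0)=2$ and $d_{\geq 2}^H(v_0)\in\{\Delta-1,\Delta\}$, so $v_0$ lies on the cycle $E_1$ and is the single attachment point of a subtree of $E_{\geq 2}$ with $\Delta-1$ or $\Delta$ branches. Let $E_{v_0}$ denote the set of distinct edges of $G(H)$ incident to $v_0$; then $|E_{v_0}|=d^H(v_0)\geq\Delta+1$.

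The key observation is that $\overline{\mathbf{Y}}_H$ vanishes whenever $E_{v_0}\subseteq E(\mathbf{G})$, because then $d^{\mathbf{G}}(v_0)\geq\Delta+1>\Delta$ and the truncation indicator at $v_0$ is zero. Using the weakening $\mathbbm{1}[d^{\mathbf{G}}(v_0)\leq\Delta]\leq \mathbbm{1}[|\mathbf{G}\cap E_{v_0}|\leq\Delta]$ and conditional independence of edges given $\mathbf{x}$, the expectation $\mathbb{E}[\overline{\mathbf{Y}}_H\mid \mathbf{x}]$ can be written as the fully factorized expectation $\prod_e \mathbb{E}[Y_e^{m(e)}\mid\mathbf{x}]$ minus contributions from configurations in which at least $|E_{v_0}|-\Delta\geq 1$ of the edges of $E_{v_0}$ must be forced absent in $\mathbf{G}$. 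For each such configuration, forcing a multiplicity-$2$ edge $f\in E_{\geq 2}(v_0)$ to be absent replaces its natural contribution $\mathbb{E}[Y_f^2\mid\mathbf{x}]\approx d/n$ by $(d/n)^{2}(1-q_f)$, yielding a gain of $\Theta(d/n)=\Theta(n^{-1})$ per absent multiplicity-$2$ edge; since $|E_{v_0}|\leq \Delta+2=O(1)$, the number of configurations is $O(1)$ and the overall truncation gain is at least $\Theta(n^{-1})$, which comfortably beats the required $4/n^{1/12}$.

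Integrating out $\mathbf{x}$ then proceeds exactly as in the proof of \cref{lem:sbm-lower-bound-nice-bsaws}: the cycle structure of $E_1$ forces $\prod_{E_1} x_u x_v=1$, while the forest structure of $E_{\geq 2}$ ensures $\mathbb{E}_{\mathbf{x}}\bigl[\prod_{E_{\geq 2}}(1+\epsilon x_u x_v/2)\bigr]=1$ (only the empty subset survives the Eulerian parity test on a forest). These cancellations produce the baseline $(\epsilon d/(2n))^{|E_1|}(d/n)^{|E_{\geq 2}|}$, on top of which the $\Theta(n^{-1})$ truncation correction yields the claimed bound for $n$ sufficiently large.

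The principal obstacle is that one cannot simply apply a crude absolute-value bound: replacing $|\mathbb{E}[Y_e\mid\mathbf{x}]|=\epsilon d/(2n)$ for multiplicity-$1$ cycle edges by $\mathbb{E}[|Y_e|\mid\mathbf{x}]\approx 2d/n$ would introduce a multiplicative error of $(4/\epsilon)^{|E_1|}$, which is super-polynomial in $n$ and would completely swamp the gain. The argument must therefore remain sign-sensitive, preserving the exact factorization $\prod_{e\notin E_{v_0}} \mathbb{E}[Y_e^{m(e)}\mid\mathbf{x}]$ and applying the truncation-induced restriction only inside $E_{v_0}$. A related subtlety is the corner case in which a cycle neighbor of $v_0$ is itself a connection point of $E_{\geq 2}$: here the $\mathbf{x}$-integration of the cycle/forest factor produces an $(\epsilon/2)^{|T^*|}$ contribution along the unique forest path $T^*$ connecting $v_0$ to that neighbor, but this factor is harmless because it is dominated by the $\Theta(n^{-1})$ truncation gain derived above.
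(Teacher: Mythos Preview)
Your proposal has a genuine gap: you implicitly treat the truncation indicator as if it constrains only the vertex $v_0$, but $\overline{\mathbf{Y}}_H=\mathbf{Y}_H\cdot\prod_{v\in V(H)}\mathbbm{1}[d_{\mathbf{G}}(v)\leq\Delta]$ carries an indicator at \emph{every} vertex of $H$, and each of these depends both on edges of $E(H)$ and on outside edges in $\mathbf{G}-G(H)$. The identity you assert --- that $\mathbb{E}[\overline{\mathbf{Y}}_H\mid\mathbf{x}]$ equals the fully factorized product $\prod_e\mathbb{E}[Y_e^{m(e)}\mid\mathbf{x}]$ minus corrections localized to $E_{v_0}$ --- therefore does not hold. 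The indicators at the other $|V(H)|-1$ vertices (each of which, because of outside edges, fails with a constant probability) destroy the factorization for the multiplicity-$1$ cycle edges; and as you yourself note, once that factorization is lost you cannot recover the $(\epsilon d/(2n))^{|E_1|}$ factor without incurring a fatal $(4/\epsilon)^{|E_1|}$ loss. The weakening $\mathbbm{1}[d_{\mathbf{G}}(v_0)\leq\Delta]\leq\mathbbm{1}[|\mathbf{G}\cap E_{v_0}|\leq\Delta]$ does not help here, since replacing one indicator in a signed product by a larger one is not a valid inequality.

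The paper's route is substantially more involved precisely to handle this difficulty. It conditions on a \emph{well-behaved event} $\mathcal{E}_{wb,H}$ (\cref{def:def_well_behaved_H}) under which all edges of $E_{\geq 2}(H)$ are present in $\mathbf{G}$, there are no cross-edges, and the outside degrees are controlled through a balanced set $V_b(H,\mathbf{x})$. On $\mathcal{E}_{wb,H}^c$ and on the negligible part of $\mathcal{E}_{wb,H}$, the contribution is bounded by $O(n^{-1/6})$ times the baseline via the safe-vertex machinery of \cref{subsec:subsec_upper_bound_H} (\cref{lem:lem_upper_bound_nice_Wc}, \cref{lem:lem_upper_bound_nice_W_2}). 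On the significant part, the bound is expressed in terms of $P_s^H=\mathbb{P}\bigl[\forall v,\ \mathbf{D}_v^H+d_1^H(v)+d_{\geq 2}^H(v)\leq\Delta\,\big|\,\mathcal{E}_{H,b}\bigr]$ (\cref{lem:lem_main_lemma_pleasant_wb}), and this probability is \emph{zero} whenever some vertex has $d_1^H(v_0)+d_{\geq 2}^H(v_0)>\Delta$ (\cref{lem:lem_Prob_nice_safe_lower_bound}). Combining yields $\bigl(P_s^H+4/n^{1/12}\bigr)\cdot\text{baseline}=4n^{-1/12}\cdot\text{baseline}$ (\cref{lem:lem_main_lemma_pleasant_tight_bounds}, \cref{lem:lem_lower_bound_nice}). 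Your intuition that the high-degree vertex forces a gain is correct, but that gain only materializes \emph{after} the effect of truncation at all other vertices has been controlled --- a step your argument omits.
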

We prove \cref{lem:sbm-upper-bound-nice-bsaws-large-degree} in \cref{sec:appendix-bounds-nice-multigraphs}.

\begin{fact}
	\label{lem:sbm-upper-bound-nice-bsaws-few-edges}
	Consider the settings of \cref{thm:sbm-schatten-norm}. Let $m,z\geq0$ be integers.
	If $n$ is large enough, then for any $H\in \nbsaw{s}{t, m,z}$ and any $H'\in \nbsaw{s}{t,0,0}$
	\begin{align*}
		\Abs{\mathbb{E}\big[\overline{\mathbf{Y}}_H\big]}\leq 6n^{3/100A}\cdot \Brac{(1-o(1))\Paren{1+\delta}}^{-m+z} \cdot n^{m-z}\cdot \overline{L}_{H'}\,.
	\end{align*}
	\begin{proof}
		Notice that 
		\begin{align*}
			\frac{\E \overline{U}_{H}(\mathbf{x})}{\E \overline{U}_{H'}(\mathbf{x})}\leq \Paren{\frac{d}{n}}^{z-m}\cdot \Paren{\frac{(1-o(1))\eps}{2}}^{2z-2m} = \Brac{(1-o(1))(1+\delta)}^{-m+z}\cdot n^{m-z}\,.
		\end{align*}
		Thus the result follows applying \cref{lem:sbm-lower-bound-nice-bsaws} and the definition of $U_H(\mathbf{x})$.
	\end{proof}
\end{fact}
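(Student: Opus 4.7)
The plan is to reduce the bound on $\Abs{\E\overline{\mathbf{Y}}_H}$ to a calculation comparing $\E\overline{U}_H(\mathbf{x})$ with $\overline{L}_{H'}$. Since \cref{lem:sbm-upper-bound-any-bsaw} gives $\Abs{\E\Brac{\overline{\mathbf{Y}}_H\given \mathbf{x}}}\leq \overline{U}_H(\mathbf{x})$ pointwise in $\mathbf{x}$, taking an outer expectation yields $\Abs{\E \overline{\mathbf{Y}}_H}\leq \E\overline{U}_H(\mathbf{x})$. Hence it suffices to estimate $\E\overline{U}_H(\mathbf{x})$ and then to divide by the explicit form of $\overline{L}_{H'}$ supplied by \cref{lem:sbm-lower-bound-nice-bsaws}.

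First I would exploit niceness to collapse the formula in \cref{def:upper-bound-UH}. For $H\in \nbsaw{s}{t,m,z}$, the structural constraints ($d_1^H(v)\in\{0,2\}$, $d_{\geq 2}^H(v)\leq \Delta$, multiplicities at most $2$, and $E_{\geq 2}(H)$ a forest on $m$ vertices and $z$ components) imply $\mathcal{L}_1(H)=\mathcal{L}_{\geq 2}(H)=\emptyset$, no annoying multiplicity-$1$ edges, $E^b_{\geq 2}(H)=\emptyset$ with $E^a_{\geq 2}(H)=E_{\geq 2}(H)$, and $\max\{2d_1^H(v)-\tau,0\}=0$ for every $v$. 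Consequently, using $\Card{E_{\geq 2}(H)}=m-z$ and $\Card{E_1(H)}=st-2(m-z)$,
\begin{align*}
\overline{U}_H(\mathbf{x}) = 2n^{1/(50A)}\Paren{\tfrac{\eps d}{2n}}^{st-2(m-z)}\prod_{uv\in E_{\geq 2}(H)}\Brac{\Paren{1+\tfrac{\eps \mathbf{x}_u\mathbf{x}_v}{2}}\tfrac{d}{n}+\tfrac{3d^2}{n\sqrt n}}.
\end{align*}

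The key step is evaluating the expectation of the product via Fourier expansion on $\{\pm 1\}^n$. Expanding over subsets $S\subseteq E_{\geq 2}(H)$, the coefficient of the $S$-indexed term is $\E\Brac{\prod_{uv\in S}\mathbf{x}_u\mathbf{x}_v}=\prod_v\E[\mathbf{x}_v^{\deg_S(v)}]$, which is nonzero only when every vertex has even degree in $S$. Since $E_{\geq 2}(H)$ is a forest, every non-empty subforest contains a leaf, so $S=\emptyset$ is the unique contributing subset. Thus $\E_{\mathbf{x}}\prod_{uv\in E_{\geq 2}(H)}[\cdots]=(d/n+3d^2/(n\sqrt n))^{m-z}=(d/n)^{m-z}(1+o(1))$, with the $(1+o(1))$ uniform over $m-z=O(\log n)$.

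Finally, specializing \cref{lem:sbm-lower-bound-nice-bsaws} to $H'\in\nbsaw{s}{t,0,0}$ (a pure length-$st$ cycle) gives $\overline{L}_{H'}=(3n^{1/(100A)})^{-1}(\eps d/(2n))^{st}$. Dividing, the $(\eps d/(2n))^{st}$ factors partially cancel, leaving
\begin{align*}
\frac{\E\overline{U}_H(\mathbf{x})}{\overline{L}_{H'}}\leq 6n^{3/(100A)}(1+o(1))\Paren{\tfrac{2n}{\eps d}}^{2(m-z)}\Paren{\tfrac{d}{n}}^{m-z}= 6n^{3/(100A)}(1+o(1))\Paren{\tfrac{4n}{\eps^2 d}}^{m-z},
\end{align*}
and substituting $\eps^2 d/4\geq 1+\delta$ turns $(4/(\eps^2 d))^{m-z}$ into $(1+\delta)^{-(m-z)}$, so that after absorbing the $(1+o(1))^{m-z}$ factor the bound reads $6n^{3/(100A)}\cdot n^{m-z}\cdot[(1-o(1))(1+\delta)]^{-(m-z)}$, which is exactly the claim. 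The only step needing genuine thought is the forest-based cancellation, where the definition of $\nbsaw{s}{t,m,z}$ is precisely what makes all odd-degree contributions vanish; everything else is parameter accounting.
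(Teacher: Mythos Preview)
Your proof is correct and follows the same approach as the paper's. The paper's proof is terser—it simply writes down the ratio $\E\overline{U}_H(\mathbf{x})/\E\overline{U}_{H'}(\mathbf{x})$ and invokes \cref{lem:sbm-lower-bound-nice-bsaws}—but the underlying computation is identical to yours: collapse $\overline{U}_H(\mathbf{x})$ using the structural constraints of nice walks, take the expectation over $\mathbf{x}$ using that $E_{\geq 2}(H)$ is a forest (so the only surviving monomial is the constant one), and compare with $\overline{L}_{H'}$. Your explicit justification of the forest-cancellation step is a detail the paper leaves implicit but is indeed the one nontrivial point.
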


\paragraph{Counting nice block self-avoiding walks} Next we count the number of nice block self-avoiding walks with maximum degree larger than $\Delta$ or few edges of multiplicity $1$.

\begin{lemma}\label{lem:count-nice-walks-large-degree}
	Consider the settings of \cref{thm:sbm-schatten-norm}.  
	Define the set 
	\begin{equation*}
		\begin{split}
			\nbsaw{s}{t,m,z, \ell_1,\ell_2} := \left\{H \in \nbsaw{s}{t,m,z}\suchthat  \Card{\Set{v\in v(H)\suchthat d^H(v)=\Delta+1}}= \ell_1  \right.   \,,\\
			\left. \Card{\Set{v\in v(H)\suchthat d^H(v)=\Delta+2}}= \ell_2 \right\}
		\end{split}
	\end{equation*}
	Then for $n$ large enough, there exists $\ell\geq \ell_1+2\ell_2$ such that
	\begin{align*}
		\Card{\nbsaw{s}{t,m,z,\ell_1,\ell_2}}\leq (1+o(1))\cdot 2^t\cdot n^{-\ell}\Card{\nbsaw{s}{t,d^H\leq \Delta, m',z'}}\,.
	\end{align*}
	for some $m',z'\geq 0$  such that $m-z=m'-z'+\ell$.
\end{lemma}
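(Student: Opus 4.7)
The proof proceeds by a structural counting argument, setting up a (high-multiplicity) map $\phi\colon \nbsaw{s}{t,m,z,\ell_1,\ell_2}\to \nbsaw{s}{t,d^H\leq \Delta,m',z'}$ with $\ell:=\ell_1+2\ell_2$, $m':=m-\ell$, and $z':=z$, so that $m-z=m'-z'+\ell$. By the niceness constraints ($d_1^H(v)\in\{0,2\}$ and $d_{\geq 2}^H(v)\leq \Delta$, and each tree of the mult-$(\geq 2)$ forest attached to the cycle $E_1(H)$ through a single vertex), every vertex of total degree $\Delta+1$ (resp.\ $\Delta+2$) must be a cycle vertex with $d_1=2$ and $d_{\geq 2}=\Delta-1$ (resp.\ $\Delta$). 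Thus each heavy vertex carries an ``excess'' $e_v\in\{1,2\}$ of mult-$2$ tree edges beyond what is allowed at degree $\Delta$, with $\sum_v e_v = \ell$.

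The map is a \emph{trim-and-extend} operation. For each heavy vertex $v$ of excess $e_v$, canonically choose (e.g.\ by lexicographic order of vertex labels) $e_v$ leaf-edges of the tree $T_v$ that are incident to $v$, and delete these edges together with their leaf vertices; then, at the canonical position in the walk where the leaf-subwalk $v\to u\to v$ used to occur, splice in a length-$2$ detour of the cycle $v\to u_1\to u_2\to v$ using $2$ fresh mult-$1$ edges through vertices $u_1,u_2$ chosen canonically from $[n]\setminus V(H)$ (non-empty since $|V(H)|\leq st\ll n$). Applying this at every heavy vertex produces a walk $H'$ of the same length $st$, whose mult-$(\geq 2)$ edge count has decreased by $\ell$, whose mult-$1$ cycle length has grown by $2\ell$, whose number of tree components is unchanged, and with all vertex degrees $\leq \Delta$. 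The resulting walk is easily verified to be a nice BSAW in $\nbsaw{s}{t,d^H\leq \Delta,m',z'}$.

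The counting factor follows from comparing vertex labels: writing $L$ for the cycle length, we have $|V(H)|=L+(m-z)$, and similarly $|V(H')|=L'+(m'-z')=L+2\ell+(m-z)-\ell=|V(H)|+\ell$. Hence each walk in the image of $\phi$ uses $\ell$ more vertex labels from $[n]$ than its pre-image, which translates to an $n^{-\ell}$ gain. The multiplicity of $\phi$ is absorbed in the claimed factor of $2^t$: the reverse map needs, essentially, one binary decision per traversal-block of the $t$ generating self-avoiding walks (namely, whether the associated length-$2$ cycle detour in $H'$ should be contracted back into a leaf, or left as part of the cycle).

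The main obstacle is controlling the multiplicity of $\phi$ by $2^t$. The other pieces (the walk length is preserved, $H'$ remains nice, and the labeling comparison gives exactly $n^{-\ell}$) follow by direct bookkeeping. Bounding the multiplicity requires choosing the canonical forward choices---which leaves to remove, where along the walk to insert the detour, and how to draw fresh labels---so that the reverse map has only $O(1)$ branches per generating walk. This can be accomplished by aligning all canonical choices with the walk's natural block decomposition and with lexicographic orders on vertex labels, but it is the step where the technical work concentrates.
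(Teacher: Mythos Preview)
Your trim-and-extend idea has a genuine gap: it assumes that a heavy vertex $v$ has $e_v$ \emph{leaf} children in its tree $T_v$. Nothing in the definition of a nice BSAW forces this. A cycle vertex $v$ with $d_{\geq 2}(v)\in\{\Delta-1,\Delta\}$ may have all of its tree-children non-leaves (each having further descendants), as long as the forest has enough edges; the niceness constraints only cap $d_{\geq 2}$ at $\Delta$ and require leaves of the forest to be pivots. So your deletion step can simply fail to find an edge to remove. A related but smaller issue: the detour you describe, ``$v\to u_1\to u_2\to v$ using 2 fresh mult-$1$ edges,'' is internally inconsistent (that closed path has three edges, and attaching a triangle at $v$ would give $d_1(v)=4$, breaking niceness). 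Presumably you meant to insert $u_1,u_2$ into the cycle between $v$ and its successor, i.e.\ replace the cycle edge $(v,b)$ by the path $v\to u_1\to u_2\to b$, which adds two cycle edges and keeps the walk length; but as written it does not parse.

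The paper's transformation avoids the leaf problem by \emph{re-rooting} rather than deleting: for a heavy vertex $u$ it takes its last multiplicity-$2$ child $v$ (any child, not necessarily a leaf), inserts a fresh vertex $w_u$ into the cycle, and redirects the walk so that the subtree formerly hanging at $v$ under $u$ becomes a new tree component rooted at $v$ (now a cycle vertex). This works regardless of the shape of the subtree, keeps $m$ fixed while increasing $z$ by one, and may occasionally bump a former tree vertex to degree $\Delta+1$---which is why the paper iterates and only claims the existence of some $\ell\ge \ell_1+2\ell_2$, not equality. The reverse-map bound of $2^t$ then comes from choosing, among the (at most $t$) tree roots of $H'$, which ones were created by the transformation; your ``one binary decision per block'' heuristic is aiming at something similar but would need to be made precise to match.
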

We  prove \cref{lem:count-nice-walks-large-degree} in \cref{sec:counting-bsaw}.  

\begin{lemma}
	\label{lem:count-nice-walks-few-edges-multiplicity-1}
	Consider the settings of \cref{thm:sbm-schatten-norm}. 
	Let $m,z\geq 0$ be integers such that $st-2m-2z\geq t/10\sqrt{A}$. Then
	\begin{align*}
		\Card{\nbsaw{s}{t,m,z}}\leq 2^{10t}s^t\cdot n^{-m+z}\cdot \Card{\nbsaw{s}{t,0,0}}\,.
	\end{align*}
\end{lemma}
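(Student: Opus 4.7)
The strategy is an encoding argument: represent each walk $H \in \nbsaw{s}{t,m,z}$ by a pair (template, vertex-labeling), where the template is the walk with vertex labels renamed to $1, 2, \ldots, |V(H)|$ in order of first appearance. Since the multiplicity-$1$ edges of a nice walk form a cycle of length $L = st - 2(m-z)$ and the multiplicity-$2$ edges form a forest with $m-z$ edges attached to $z$ distinct cycle vertices, one has $|V(H)| = L + (m-z) = st - (m-z)$. Thus the number of vertex-labelings per template is at most $n^{\underline{st-(m-z)}} \leq n^{st-(m-z)}$. Because $|\nbsaw{s}{t,0,0}| = n^{\underline{st}} \geq (1-o(1))n^{st}$, it suffices to show that the number of templates for parameters $(m,z)$ is at most $2^{10t}s^t$.

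The template is determined by: (a) the cyclic order of the $L$ cycle vertices (one shape, given the abstract cycle), (b) the choice of $z$ attachment points on the cycle, contributing at most $\binom{L}{z} \leq (st)^{z}$, (c) a composition $(k_1,\ldots,k_z)$ of $m-z$, contributing at most $\binom{m-z-1}{z-1} \leq 2^{m-z}$, and (d) for each attachment, an ordered rooted tree shape with $k_i$ edges, which (since the DFS Euler tour is determined by the ordering of children at each internal node) contributes at most $C_{k_i} \leq 4^{k_i}$, for a total of at most $4^{m-z}$. Naively this gives $(st)^z \cdot 8^{m-z}$ templates, which is too large.

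The key further reduction comes from the block self-avoiding constraint. Each non-trivial tree excursion at an attachment vertex $c_i$ contains two consecutive visits of $c_i$ that must lie in different blocks, forcing at least one pivot to lie strictly inside the excursion; hence $z \leq t$. Moreover, whenever an internal tree vertex is revisited, each pair of consecutive visits must straddle a pivot on the grid $\{0, s, 2s, \ldots\}$. This rules out most ordered rooted tree shapes whose Euler tours create revisits too close together; the surviving shapes form a highly restricted family, and the placement of their excursions is confined to $O(s)$ residue classes modulo $s$. A careful accounting of these alignment choices across all $z \leq t$ excursions contributes an overall factor of at most $s^t$, while the hypothesis $st-2m-2z \geq t/(10\sqrt{A})$ ensures the cycle has enough room to house the $z$ attachments with their required pivot separations. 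Combining, the template count is bounded by $(st)^z \cdot 8^{m-z} \cdot s^t \leq 2^{10t} s^t$ using $z \leq t$ and $m-z \leq st/2$.

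The main obstacle is the quantitative control of which ordered rooted tree shapes are compatible with the block structure: a Catalan bound on ordered rooted trees is much too generous, since it includes shapes whose Euler tours force in-block revisits regardless of excursion placement. The counting must therefore be done in tandem with tracking how pivot positions fall within each excursion, which is what drives the extra $s^t$ slack in the bound while eliminating the naive $8^{m-z}$ term in the combined estimate.
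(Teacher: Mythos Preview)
Your encoding framework (template plus vertex labeling) and your vertex count $|V(H)|=st-(m-z)$ both match the paper. The gap is in the template count, and it is a genuine one: the Catalan bound $4^{m-z}$ on ordered rooted tree shapes is far too coarse, and the paragraph that follows does not actually replace it with anything.

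The constraint you need is not about residue classes modulo $s$. It is this: at every \emph{leaf} $l$ of the forest the walk performs a U-turn $v\to l\to v$, which repeats $v$ inside a single self-avoiding block unless $l$ itself is a pivot. Hence every forest leaf is a pivot, and since there are only $t$ pivots in total the entire forest has at most $t$ leaves. This is what kills the Catalan count. A tree on $m_i$ vertices with at most $p_i$ leaves has only $(O(m_i/p_i))^{O(p_i)}$ shapes (the paper records this as \cref{fact:bound-unlabeled-trees}). Moreover the Euler tour of tree $i$ has length $2(m_i-1)$ and therefore meets at most $t_i\leq 2m_i/s + O(1)$ pivots, forcing $m_i/t_i=O(s)$; since $p_i\le t_i$ and $\sum_i t_i\le t$, the product of tree-shape counts is $(O(s))^{O(t)}$. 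The remaining choices (attachment points on the cycle, compositions $(m_1,\dots,m_z)$, pivot placements) each cost $\binom{O(st)}{O(t)}\le (O(s))^{O(t)}$ or $2^{O(t)}$, and everything fits inside $2^{10t}s^t$.

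Two smaller points. First, your inequality $(st)^z\cdot 8^{m-z}\le 2^{10t}s^t$ in the penultimate line does not follow from $z\le t$ and $m-z\le st/2$ alone (take $m-z=st/2$). Second, you read the hypothesis as saying the cycle is long (``enough room''); the lemma is in fact about walks with \emph{few} multiplicity-1 edges, and the paper's proof uses this to bound $z\le\ell\le t/(10\sqrt{A})$ so that terms like $\binom{m}{z}$ are small. (The inequality as printed in the statement has a sign typo; the intended condition is $st-2m+2z\le t/(10\sqrt{A})$.)
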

We also prove \cref{lem:count-nice-walks-few-edges-multiplicity-1} in \cref{sec:counting-bsaw}. 

\paragraph{Putting things together}
We can now bound the contribution to the expectation of \cref{eq:sbm-trace-expansion} of nice block self-avoiding walks, and hence of all block self-avoiding walks of length $st$.

\begin{lemma}\label{lem:sbm-contribution-nice-walks}
	Consider the settings of \cref{thm:sbm-schatten-norm}. Then for $n$ large enough
	\begin{align*}
		\underset{H \in \nbsaw{s}{t}}{\sum} \E \Brac{\overline{\mathbf{Y}}_H}>\frac{1}{9n^{\frac{2}{50A}}}\cdot \Paren{ \frac{\eps \cdot d}{2}}^{st}\,. %< 10n^{\frac{2}{50A}}\Paren{\frac{\eps \cdot d}{2}}^{st}\cdot(100s)^{5t}\,. 
	\end{align*}
	\begin{proof}
		By \cref{lem:count-nice-walks-large-degree} and \cref{lem:sbm-upper-bound-nice-bsaws-large-degree}, the contribution of nice block self-avoiding walks with maximum degree larger than $\Delta$ can be bounded by
		\begin{align*}
			\underset{m,z\geq 1}{\sum}\quad\underset{H \in \nbsaw{s}{t,m,z}\setminus\nbsaw{s}{t, d^H\leq \Delta}}{\sum} \E \Brac{\overline{\mathbf{Y}}_H}&\leq 	\underset{m,z\geq 1}{\sum}\quad \underset{H \in \nbsaw{s}{t,m,z}\setminus\nbsaw{s}{t, d^H\leq \Delta}}{\sum} \E \Brac{\overline{U}_H(\mathbf{x})}\\
			&\leq 2^t\cdot n^{-1/12}\cdot \underset{m,z\geq 1}{\sum} \quad\underset{H \in \nbsaw{s}{t,d^H\leq \Delta,m,z}}{\sum} \E \Brac{\overline{U}_H(\mathbf{x})}\\
			&\leq n^{-1/13}\cdot \underset{m,z\geq 1}{\sum} \quad\underset{H \in \nbsaw{s}{t,d^H\leq \Delta,m,z}}{\sum} \E \Brac{\overline{U}_H(\mathbf{x})}\,.
		\end{align*}
		By \cref{lem:count-nice-walks-few-edges-multiplicity-1} and \cref{lem:sbm-upper-bound-nice-bsaws-few-edges}, the contribution of nice block self-avoiding walks with at most $t/\sqrt{A}$ edges of multiplicity $1$ can be bounded by
		\begin{align*}
			&\underset{\substack{m,z\geq 1 s.t.\\ st-2m+2z\leq t/\sqrt{A}}}{\sum}\quad \underset{H \in \nbsaw{s}{t, d^H\leq \Delta, m,z}}{\sum} \E \Brac{\overline{\mathbf{Y}}_H}\\
			\quad&\leq \underset{\substack{m,z\geq 1 s.t.\\ st-2m+2z\leq t/\sqrt{A}}}{\sum}\quad \underset{H \in \nbsaw{s}{t, d^H\leq \Delta, m,z}}{\sum}  \E \Brac{\overline{U}_H(\mathbf{x})}\\
			\quad&\leq (6st)\cdot n^{3/100A}\cdot (2^{10}s)^t\cdot \Brac{\paren{1-o(1)}\paren{1+\delta}}^{-st/2}\underset{H \in \nbsaw{s}{t, d^H\leq \Delta, 0,0}}{\sum}  \overline{L}_H\\
			\quad&\leq \paren{1+\delta}^{-st/3}\underset{H \in \nbsaw{s}{t, d^H\leq \Delta, 0,0}}{\sum}  \overline{L}_H\,.
		\end{align*}
		Combining the two bounds, it follows by \cref{lem:sbm-lower-bound-nice-bsaws}
		\begin{align*}
			\underset{H \in \nbsaw{s}{t}}{\sum}\geq \underset{H \in \nbsaw{s}{t, d^H\leq \Delta,0,0}}{\sum}L_H\geq \frac{1}{9n^{\frac{2}{50A}}}\Paren{\frac{\eps \cdot d}{2}}^{st}\,.
		\end{align*}
	\end{proof}
\end{lemma}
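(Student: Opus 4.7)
The plan is to split $\nbsaw{s}{t}$ into three disjoint pieces and bound each separately: (i) the ``very good'' walks $\nbsaw{s}{t,d^H\leq \Delta, 0,0}$, which carry the main mass and admit a direct per-walk lower bound; (ii) walks in some $\nbsaw{s}{t,m,z}\setminus \nbsaw{s}{t,d^H\leq \Delta}$, i.e.\ nice walks that violate the degree cap; and (iii) walks in $\nbsaw{s}{t,d^H\leq \Delta,m,z}$ with $(m,z)\neq (0,0)$, i.e.\ with ``too few'' multiplicity-one edges. The target lower bound will come entirely from (i); we need to show that the contributions of (ii) and (iii) are dominated by that of (i).

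For class (i), \cref{lem:sbm-lower-bound-nice-bsaws} applies with $m=z=0$, so $|E_1(H)|=st$, $|E_{\geq 2}(H)|=0$, giving $\mathbb{E}[\overline{\mathbf{Y}}_H]\geq \overline{L}_H = \tfrac{1}{3 n^{1/100A}}(\epsilon d/(2n))^{st}$ for each such $H$. One then only needs a good lower bound on $|\nbsaw{s}{t,d^H\leq \Delta,0,0}|$: since these are essentially closed $(s,t)$-block self-avoiding walks whose underlying graph is a single cycle and whose degrees are tiny, a standard counting along the generating walks gives a bound of order $n^{st}$ (up to lower-order factors that get absorbed into the $n^{1/50A}$ slack). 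Multiplying the two yields the claimed $\tfrac{1}{9 n^{2/50A}}(\epsilon d/2)^{st}$.

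For class (ii), apply \cref{lem:sbm-upper-bound-nice-bsaws-large-degree} to bound $|\mathbb{E}[\overline{\mathbf{Y}}_H]|$ by an $n^{-1/12}$ factor times the natural $L_H$-type expression, and then use \cref{lem:count-nice-walks-large-degree} to reabsorb the extra degree excess into a count of $\nbsaw{s}{t,d^H\leq \Delta,m',z'}$ with a compensating $n^{-\ell}$ weight. Summing over the possible $(m,z,\ell_1,\ell_2)$ picks up at most a $\mathrm{poly}(s,t)\cdot 2^t$ factor, which is swamped by $n^{-1/12}$. For class (iii), combine \cref{lem:sbm-upper-bound-nice-bsaws-few-edges} with \cref{lem:count-nice-walks-few-edges-multiplicity-1}: the per-walk bound contributes a factor $((1+\delta)/n)^{-(m-z)}$, the counting bound contributes a factor $n^{-(m-z)}$ and a $2^{10t}s^t$ overhead, so the whole thing is dominated by $(1+\delta)^{-\Omega(st)}$ times the class (i) total. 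By the choice of $s\geq 10^{10}(1+1/\delta)(\max\{\log d,\log\log(2/\epsilon),1,\log(1/\delta)\})^2$, this factor is in particular much smaller than any inverse polynomial in $n$ that we need.

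The only delicate step is the counting of class (i): we need that the number of block self-avoiding walks whose underlying graph is a simple $st$-cycle of bounded maximum degree is genuinely $\Omega(n^{st})$ rather than a lower-order quantity, because the lower bound $\overline{L}_H$ already absorbs a $(2n)^{-st}$. Here one orders the $st$ edges of the cycle by traversing the generating walks in order, notes that each new vertex along a generating walk can be chosen freely from $[n]$ minus a constant number of previously-used vertices, and observes that the pivot vertices are distinct with probability $1-o(1)$. The degree and truncation constraints are only local restrictions on a constant number of neighbours per vertex, so they reduce the count by a factor $1-o(1)$ and are harmless. Plugging everything together and using that the contributions from (ii) and (iii) are $o(1)$ relative to (i) yields the stated inequality.
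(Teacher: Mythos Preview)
Your overall decomposition and choice of lemmas matches the paper exactly, and your treatment of class (i) and class (ii) is correct. The gap is in class (iii).

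You define class (iii) as all walks in $\nbsaw{s}{t,d^H\le\Delta,m,z}$ with $(m,z)\neq(0,0)$ and then claim that combining \cref{lem:sbm-upper-bound-nice-bsaws-few-edges} with \cref{lem:count-nice-walks-few-edges-multiplicity-1} gives a total of $(1+\delta)^{-\Omega(st)}$ times the class (i) sum. But the two lemmas together yield, for each fixed $(m,z)$, a bound of roughly $2^{10t}s^t\cdot n^{3/100A}\cdot(1+\delta)^{-(m-z)}$ times the class (i) sum. When $m-z$ is small (say $m-z=1$), the decay $(1+\delta)^{-(m-z)}$ does not come close to beating the overhead $2^{10t}s^t$, and summing over all $(m,z)$ with $m-z$ bounded gives a quantity that is \emph{much larger} than the class (i) sum, not smaller. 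So the claim as stated fails.

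The fix, which is what the paper does, is to observe that you do not need an upper bound on the absolute value of the contribution from walks with small $m-z$: since $|E_1(H)|=st-2(m-z)\ge t/A$ for those walks, \cref{lem:sbm-lower-bound-nice-bsaws} applies and shows each such term is \emph{nonnegative}, so they can simply be dropped when lower-bounding the full sum. The only walks that genuinely need an absolute-value bound are those with $m-z$ large enough that $|E_1(H)|$ is small (the paper takes $|E_1(H)|\le t/\sqrt{A}$, i.e.\ $m-z\ge(st-t/\sqrt{A})/2$); for those, $m-z=\Omega(st)$ and your argument does give the claimed $(1+\delta)^{-\Omega(st)}$ decay, which swallows the $2^{10t}s^t$ overhead by the choice of $s$. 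With this refinement of class (iii), your proposal coincides with the paper's proof.
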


\cref{thm:truncation-schatten-norm-lower-bound} now follows as a direct consequence.

\begin{proof}[Proof of \cref{thm:truncation-schatten-norm-lower-bound}]
	Combining  \cref{lem:sbm-upperbound-negligible-walks} and \cref{lem:sbm-contribution-nice-walks}, by assumption on $t,\Delta,s,d,\tau$ the result follows.
\end{proof}

\subsection{Upper bound on the centered Schatten norm}\label{sec:upperbound-schatten-norm-centered}
In this section we provide  an upper bound on $\Tr \Paren{\Q\Paren{\overline{\mathbf{Y}}}-\dyad{\mathbf{x}}}^{t}$ to obtain \cref{thm:sbm-schatten-norm}. We do it through the following result.

\begin{theorem}\label{thm:sbm-expectation-trace-centered}
	Consider the settings of \cref{thm:sbm-schatten-norm}. 
	Then
	\begin{align*}
	\E \Brac{\Tr\Paren{\Q(\overline{\mathbf{Y}})-\dyad{\mathbf{x}}}^t}\leq (1+\delta)^{-t/4}\cdot \E \Brac{\Tr\Paren{\Q(\overline{\mathbf{Y}})}^t}\,.
	\end{align*}
\end{theorem}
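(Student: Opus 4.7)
The plan is to reduce the computation to the same block self-avoiding walk expansion used for the lower bound, and then exploit the centering to kill the dominant contribution.

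\textbf{Step 1: Trace expansion.}
First I would apply \cref{fact:sbm-trace-expansion-centered} with $M = \dyad{\mathbf{x}}$ to write
\begin{align*}
	\E \Tr\!\Paren{\Q(\overline{\mathbf{Y}})-\dyad{\mathbf{x}}}^{t}
	= (1\pm o(1))\cdot n^t \Paren{\tfrac{2}{\eps d}}^{st}
	\sum_{H\in\bsaw{s}{t}} \E\bigl[\hat{\overline{\mathbf{Y}}}_H\bigr]\mcom
\end{align*}
where $\hat{\overline{\mathbf{Y}}}_H=\prod_{\ell\in[t]}\hat{\overline{\mathbf{Y}}}_{W_\ell}$ is the product over the $t$ generating walks of the centered polynomials defined in \cref{eq:centered-polynomial}. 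By \cref{lem:expectation-trace-expansion} this is the same prefactor that appears in the uncentered expansion, so the entire argument reduces to comparing the two sums $\sum_H \E\hat{\overline{\mathbf{Y}}}_H$ and $\sum_H \E\overline{\mathbf{Y}}_H$.

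\textbf{Step 2: Identifying the surviving block self-avoiding walks.}
Following the heuristic in the techniques section, I would argue that if some generating walk $W_\ell$ has all edges of multiplicity exactly one in $H$, then $\E\hat{\overline{\mathbf{Y}}}_H$ is (approximately) zero. In the untruncated case this is exact by the conditional independence of edges given $\mathbf{x}$ together with the identity $\E[\mathbf{Y}_{W_\ell}-(\eps d/2n)^s \mathbf{x}_{i_\ell}\mathbf{x}_{i_{\ell+1}}\mid\mathbf{x}]=0$ from \cref{fact:sbm-expectation-polynomial-2}. Under truncation this cancellation is only approximate, but the residual error is controlled analogously to the truncation error in \cref{lem:sbm-upper-bound-any-bsaw} (and in the appendices \cref{sec:appendix-lower-bound-non-centered,sec:appendix-upper-bound-centered}), so the upshot is that, up to a negligible loss,
\begin{align*}
	\Bigabs{\sum_{H\in\bsaw{s}{t}} \E\hat{\overline{\mathbf{Y}}}_H}
	\;\le\; \sum_{H\in\cS}\E\overline{U}_H(\mathbf{x}) \;+\; n^{-\Omega(1)}\!\cdot\!\sum_{H\in\bsaw{s}{t}}\E\overline{U}_H(\mathbf{x})\mcom
\end{align*}
where $\cS\subseteq\bsaw{s}{t}$ is the set of block self-avoiding walks in which every generating self-avoiding walk contains at least one edge of multiplicity $\ge 2$ in $H$.

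\textbf{Step 3: Combinatorial consequence of the survival condition.}
Because the generating walks are self-avoiding of length $s$ and each edge of multiplicity $m$ in $H$ appears in exactly $m$ generating walks, an edge of multiplicity 2 can ``cover'' at most two walks and a higher-multiplicity edge is even more expensive (in view of the $\cM_{\cF_i,\Psi_i}$-bookkeeping of \cref{lem:sbm-trace-main-technical-bound}). Hence every $H\in\cS$ satisfies $\Card{E_{\ge 2}(H)}\geq t/2$. Combined with \cref{lem:sbm-upperbound-negligible-walks}, which already restricts attention to nice walks, this means the relevant walks lie in $\bigcup_{m-z\ge t/2}\nbsaw{s}{t,m,z}$ (with the negligible walks controlled by a separate $(1+\delta)^{-\Omega(t\sqrt{s})}$ factor coming from the techniques of \cref{sec:lowerbound-Schatten-norm}).

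\textbf{Step 4: Summing using the counting and per-walk bounds.}
Applying \cref{lem:count-nice-walks-few-edges-multiplicity-1} gives $\Card{\nbsaw{s}{t,m,z}}\le 2^{10t}s^t\cdot n^{-m+z}\cdot\Card{\nbsaw{s}{t,0,0}}$, and \cref{lem:sbm-upper-bound-nice-bsaws-few-edges} gives $\Abs{\E\overline{\mathbf{Y}}_H}\le 6n^{3/100A}\Brac{(1-o(1))(1+\delta)}^{-(m-z)}n^{m-z}\cdot\overline{L}_{H'}$ for $H'\in\nbsaw{s}{t,0,0}$. Multiplying these bounds the $n^{m-z}$ and $n^{-m+z}$ factors cancel, leaving a geometric series
\begin{align*}
	\sum_{m-z\ge t/2}2^{10t}s^t\cdot 6n^{3/100A}\cdot (1+\delta)^{-(m-z)}\cdot\sum_{H'\in\nbsaw{s}{t,0,0}}\overline{L}_{H'}\,,
\end{align*}
which (using $s\ge\valueofs$ to absorb the $2^{10t}s^t$) is at most $(1+\delta)^{-t/3}\cdot\sum_{H'\in\nbsaw{s}{t,0,0}}\overline{L}_{H'}$, and by \cref{thm:truncation-schatten-norm-lower-bound} this is at most $(1+\delta)^{-t/4}\cdot\E\Tr\Paren{\Q(\overline{\mathbf{Y}})}^t$.

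\textbf{Main obstacle.}
The only genuinely nontrivial step is Step~2: in the untruncated model the vanishing of the expectation for walks with an ``unmatched'' generating walk is exact, but under the $\Delta$-truncation in \cref{def:truncated_adjacency_matrix} the removal of high-degree vertices introduces mild dependencies between the entries of $\overline{\mathbf{Y}}$ conditional on $\mathbf{x}$. The burden is to show that these dependencies contribute only a multiplicative error of the form $n^{O(1/A)}$, comparable to the one absorbed in $\overline{U}_H(\mathbf{x})$ throughout \cref{sec:lowerbound-Schatten-norm}, so that the comparison with the uncentered sum goes through with the gain $(1+\delta)^{-t/4}$ intact.
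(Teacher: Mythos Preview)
Your Step~2 contains the real gap. In the untruncated model the vanishing of $\E[\hat{\mathbf{Y}}_H\mid\mathbf{x}]$ whenever some generating walk has only multiplicity-$1$ edges is exact, but under the $\Delta$-truncation the suppression is \emph{not} of order $n^{-\Omega(1)}$. The bound the paper actually proves (\cref{lem:sbm-centered-upper-bound-any-bsaw}, \cref{def:sbm-upper-bound-centered-bsaw}) is
\[
\bigl|\E[\hat{\mathbf{Y}}_H\mid\mathbf{x}]\bigr|\;\le\;\hat U_H(\mathbf{x})\;\approx\;2^{-As\,|\cW_{1}(H)|}\,\overline U_H(\mathbf{x})\,,
\]
i.e.\ a constant factor $2^{-As}$ per all-multiplicity-$1$ generating walk. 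The reason is that the cancellation $\E[\overline{\mathbf{Y}}_{W}-(\eps d/2n)^s\mathbf{x}_i\mathbf{x}_j\mid\mathbf{x}]=0$ fails precisely when some vertex of $W$ is rendered ``unsafe'' by truncation, and the probability of this is a constant $\eta=\eta(d,s,\Delta,\eps)>0$ independent of $n$ (see \cref{lem:lem_bound_prob_large_outside_deg}, \cref{lem:lem_bound_prob_walks_completely_unsafe}). Consequently your dichotomy $\cS=\{w=0\}$ versus $\cS^c=\{w\ge1\}$ does not work: nice walks with, say, $w=1$ and $m-z\approx t/2$ are only suppressed by a constant under centering, yet are not in $\cS$, and their aggregate uncentered weight is of order $\Delta^{O(t)}(1+\delta)^{-t/2}$ times the cycle contribution---much larger than $n^{-\Omega(1)}$ when $\delta$ is small.

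The paper's fix is a different case split at $|\cW_1(H)|=t/s$. For $|\cW_1(H)|\ge t/s$ the centering gain $2^{-At}$ alone suffices (\cref{cor:bound-contribution-nnbsaw}). For $|\cW_1(H)|<t/s$ there is an additional structural dichotomy you are missing: either $m-z\ge t\sqrt{s}/2$, or the walk has $q>t-2t/\sqrt{s}$ \emph{leaf} pivots (\cref{lem:counting-special-nbsaw-centered}); in the latter sub-case a refined counting against $\nbsaw{s}{t,m',z'}$ with $m'-z'=m-z-q$ extracts an extra $(1+\delta)^{-q}$ factor (\cref{lem:bound-special-nbsaw-centered}). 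This leaf-pivot mechanism has no analogue in your plan.

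There is also a quantitative issue in Step~4: from $m-z\ge t/2$ you only get $(1+\delta)^{-t/2}$, which cannot absorb the $2^{10t}s^t$ overhead of \cref{lem:count-nice-walks-few-edges-multiplicity-1} (that would require $(1+\delta)^{1/4}\ge 2^{10}s$). The paper's thresholds $t\sqrt{s}/2$ and $st/2$ are chosen precisely so that $(1+\delta)^{-(m-z)}$ carries enough powers of $s$ in the exponent to beat these poly$(s)^t$ factors.
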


\cref{thm:sbm-schatten-norm} immediately follows combining \cref{thm:sbm-expectation-trace-centered} with \cref{thm:truncation-schatten-norm-lower-bound}.
The proof of \cref{thm:sbm-expectation-trace-centered}, amounts of step $4$ and $5$ of the scheme outlined in \cref{sec:proof-strategy}. 
The main observation we will need is that the subtraction of $\dyad{\mathbf{x}}$ has the effect of removing the contribution of many graphs and to leave the contribution of the others essentially unchanged.

\paragraph{Upper bound on negligible block self-avoiding walks}  
Here we bound from above the contribution to the expectation of \cref{eq:sbm-trace-expansion-centered} of negligible block self-avoiding walks. For $H\in \bsaw{s}{t}$, recall the definitions of $S_1(H)$ and $S_{\geq 2}(H)$ as in \cref{def:deg1-classification}, \cref{def:deg2-classification}.

\begin{definition}
	\label{def:sbm-upper-bound-centered-bsaw}
	Let $H\in \bsaw{s}{t}$ and let $\cW_{1}(H)\subseteq\cW(H)$ be the subset of generating self-avoiding walks $W$ of $H$ such that $V(W)\subseteq S_1(H)\cap S_{\geq 2}(H)$ and $E(W)\subseteq E_1(H)$.
	Then, for any $\mathbf{x}\sim \sbm$ define the quantity
	\begin{align*}
		\hat{U}_H(\mathbf{x}) = \frac{n^{\frac{1}{100A}}\cdot 2^{\Card{E_{1}^a(H)}}} {2^{As\cdot \Card{\cW_{1}(H)}}}\cdot \overline{U}_{H}(\mathbf{x})\,,
	\end{align*}
	where $A$ is as defined in \cref{eq:Delta-form}.
\end{definition}

Similarly to \cref{lem:sbm-upper-bound-any-bsaw}, for any $H \in \bsaw{s}{t}$ we can show an upper bound on $\hat{Y}_H$ for any $H\in \bsaw{s}{t}$
\begin{lemma}
	\label{lem:sbm-centered-upper-bound-any-bsaw}
	Consider the settings of \cref{thm:sbm-schatten-norm}. Let $H\in \bsaw{s}{t}$, for $n$ large enough and for any $\mathbf{x}\sim \sbm$
	\begin{align*}
		\Abs{\E \Brac{\hat{\mathbf{Y}}_H\given \mathbf{x}}} \leq \hat{U}_H(\mathbf{x})\,.
	\end{align*}
\end{lemma}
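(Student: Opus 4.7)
The plan is to derive the centered bound $\hat{U}_H(\mathbf{x})$ from the uncentered bound $\overline{U}_H(\mathbf{x})$ of \cref{lem:sbm-upper-bound-any-bsaw} by exploiting the cancellation $\E[\mathbf{Y}_W\mid \mathbf{x}] = (\epsilon d/2n)^s \mathbf{x}_{i_W}\mathbf{x}_{j_W}$ for individual generating walks, gaining a factor of $2^{-As}$ per walk in $\cW_1(H)$ and paying a factor of $2$ per annoying edge in $E_1^a(H)$, with $n^{1/100A}$ absorbed as slack for the truncation corrections.

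Concretely, the walks in $\cW_1(H)$ should be processed iteratively. Fix one such $W$ with endpoints $i_W, j_W$ and condition on all entries of $\overline{\mathbf{Y}}$ indexed by pairs outside $E(W)$. Because $E(W)\subseteq E_1(H)$, the edges of $W$ appear in no other generating walk of $H$, so the conditional expectation factorises and reduces to bounding $|\E[\overline{\mathbf{Y}}_W - (\epsilon d/2n)^s \mathbf{x}_{i_W}\mathbf{x}_{j_W} \mid \mathbf{x}, \text{rest}]|$. In the untruncated model, the raw SBM variables $\mathbf{Y}_{uv}$ for $uv\in E(W)$ are conditionally independent given $\mathbf{x}$ and independent of all other edges, so $\E[\mathbf{Y}_W \mid \mathbf{x},\text{rest}] = (\epsilon d/2n)^s \mathbf{x}_{i_W}\mathbf{x}_{j_W}$ exactly and the centering cancels. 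The only obstruction is truncation: $\overline{\mathbf{Y}}_W$ differs from $\mathbf{Y}_W$ precisely on the event that some vertex of $V(W)$ has degree exceeding $\Delta$ in $\mathbf{G}$. Because $V(W)\subseteq \mathcal{S}_1(H)\cap \mathcal{S}_{\geq 2}(H)$, each vertex of $V(W)$ has at most $\tau+\Delta/2$ edges forced by the multigraph structure, leaving a comfortable margin up to $\Delta$. A Chernoff estimate for the remaining independent Bernoulli edges incident to the vertex then yields an upper bound of the form $2^{-2As}$ on the truncation probability per vertex, which is precisely the reason for the choice of $\Delta$ in \cref{eq:Delta-form}. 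Summing over $V(W)$ gives the gain of $2^{-As}$ per walk in $\cW_1(H)$.

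For walks $W\notin \cW_1(H)$, centering at the walk level does not yield exact cancellation, so the triangle inequality $|\hat{\mathbf{Y}}_W|\leq |\overline{\mathbf{Y}}_W|+(\epsilon d/2n)^s$ is used and each resulting contribution is controlled by the corresponding $\overline{U}$-estimate of \cref{lem:sbm-upper-bound-any-bsaw}. The combinatorial factor $2^{|E_1^a(H)|}$ arises when expanding the product $\overline{\mathbf{Y}}_W = \prod_{uv\in E(W)}\overline{\mathbf{Y}}_{uv}$ along annoying multiplicity-$1$ edges: at each annoying edge the variable is split into its deviation from conditional mean plus the conditional mean, and only annoying edges survive this expansion because the non-annoying edges contribute centered, mean-zero terms whose conditional expectations collapse into the $(\epsilon d/2n)^s$ prefactor already included in $\overline{U}_H$. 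Putting all the walk-level bounds together and multiplying by the $\overline{U}_H$-factors contributed by each walk yields the claimed product bound.

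The main obstacle is handling the dependencies created by truncation. The iterative peeling is not straightforward because whether a vertex is truncated depends on all edges incident to it, including edges belonging to walks that have already been processed or that still remain to be peeled, and the edges of a walk in $\cW_1(H)$ can share a vertex with other generating walks through the multiplicity-$\geq 2$ part of $H$. The slack $n^{1/100A}$ is designed to absorb the compounded lower-order errors from these dependencies, and the bookkeeping of the different edge and vertex classifications ($E_1^a$, $E_{\geq 2}^a$, $E_{\geq 2}^b$, $\mathcal{L}_{\geq 2}$) must be carried through carefully so that the final bound collapses to $\overline{U}_H$ multiplied by the stated correction factor.
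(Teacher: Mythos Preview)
Your proposal has the right high-level intuition but a genuine gap in the core mechanism, and it misidentifies where the factor $2^{|E_1^a(H)|}$ comes from.

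\textbf{The peeling does not factorise.} You want to fix one $W\in\cW_1(H)$, condition on the values of $\overline{\mathbf{Y}}$ on pairs outside $E(W)$, and argue that the conditional expectation of $\overline{\mathbf{Y}}_W - (\epsilon d/2n)^s \mathbf{x}_{i_W}\mathbf{x}_{j_W}$ is small. But $\overline{\mathbf{Y}}_W = \mathbf{Y}_W\cdot \mathbbm{1}_{\mathcal{E}_H^c}$, and the truncation indicator $\mathbbm{1}_{\mathcal{E}_H^c}$ depends on the degrees of the vertices of $W$, which in turn depend on edges both inside and outside $E(W)$. Conditioning on $\overline{\mathbf{Y}}_{u'v'}$ for $u'v'\notin E(W)$ does not decouple $\overline{\mathbf{Y}}_W$ from the rest; worse, it conditions on a complicated function of $(\mathbf{Y},\text{truncation})$ and destroys the product structure of the SBM law. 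The paper avoids this by never conditioning on $\overline{\mathbf{Y}}$ values. Instead it conditions on the \emph{safety status} of vertices: a vertex $v$ is $(\mathbf{G},H)$-safe if $d_{\mathbf{G}-G(H)}(v)+d_{G(H)}(v)\le\Delta$. This status depends only on $\mathbf{G}-G(H)$, which is independent of $(\mathbf{Y}_e)_{e\in E(H)}$ given $\mathbf{x}$. The key global observation is that if \emph{any} walk $W\in\cW_1(H)$ has all its vertices safe, then $\overline{\mathbf{Y}}_W=\mathbf{Y}_W$ exactly and the centered factor has conditional mean zero, killing the whole product. So the entire contribution comes from the event ``every walk in $\cW_1(H)$ has at least one unsafe vertex'', whose probability is bounded by roughly $(s\eta)^{|\cW_1(H)|/(2s\tau)}$ via a packing argument and the unsafe-vertex bound; this is what produces the gain $2^{-As\cdot|\cW_1(H)|}$ after unwinding the definition of $\eta$.

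\textbf{The $2^{|E_1^a(H)|}$ factor.} It does not arise from an edge-level expansion along annoying edges. In the paper, once one restricts to the bad event above and expands $\prod_{W\in\mathcal{W}(H)}(\overline{\mathbf{Y}}_W-(\epsilon d/2n)^s\mathbf{x}_W)$ over subsets $\mathcal{W}\subseteq\mathcal{W}(H)$, the multiplicity-1 walks contribute identically whether they appear in $\mathcal{W}$ or not, so the sum over subsets of these walks collapses to a factor $2^{|\mathcal{W}_1(H)_{\text{all}}|}$ (here $\mathcal{W}_1(H)_{\text{all}}$ denotes \emph{all} multiplicity-1 walks, including disturbing ones). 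Writing this as $2^{|\cW_1(H)|}\cdot 2^{|\mathcal{W}_{1d}(H)|}$, the first factor is absorbed into the unsafe-probability gain, while each disturbing walk must contain an edge in $E_1^a(H)\cup E_1^d(H)$, giving $2^{|\mathcal{W}_{1d}(H)|}\le 2^{|E_1^a(H)|}\cdot (6/\epsilon)^{|E_1^d(H)|}\le 2^{|E_1^a(H)|}\cdot n^{K/A}$. The walks in $\mathcal{W}_{\geq 2}(H)$ are handled by a separate comparison lemma showing that dropping a walk from the subset only changes the bound by a subpolynomial factor, so that the sum over subsets of $\mathcal{W}_{\geq 2}(H)$ is at most twice the term with $\mathcal{W}=\mathcal{W}_{\geq 2}(H)$.
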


We defer  the proof of  \cref{lem:sbm-centered-upper-bound-any-bsaw}  to \cref{sec:appendix-upper-bound-centered}. Crucially, the lemma implies that $\E\hat{U}_H(\mathbf{x})\ll \E \overline{U}_H(\mathbf{x})$ for many  nice block self-avoiding walks. On the other hand for others: $\E\hat{U}_H(\mathbf{x})\approx \E \overline{U}_H(\mathbf{x})$. Together, these two bounds will allow us to obtain \cref{thm:sbm-expectation-trace-centered}.
First we show that the contribution to the expectation of \cref{eq:sbm-trace-expansion-centered} of negligible block self-avoiding walks is \textit{still} negligible. 

\begin{lemma}
	\label{lem:negligible-bsaw-still-negligible}
	Consider the settings of \cref{thm:sbm-schatten-norm}. 	
	Then for $n$ large enough
	\begin{align*}
		\underset{H \in \nbsaw{s}{t}^c}{\sum}\E \hat{U}_H(\mathbf{x})\leq
		\Paren{n^{-1/7}+\Paren{1+\delta}^{-t\sqrt{s}/2}}\cdot \underset{H \in \nbsaw{s}{t}}{\sum}\E \overline{U}_{H}(\mathbf{x})\,.
	\end{align*}
	\begin{proof}
	 By  \cref{lem:sbm-upperbound-negligible-walks},  \cref{def:sbm-upper-bound-centered-bsaw} and \cref{lem:sbm-centered-upper-bound-any-bsaw}, since %\Tnote{Using constant of $t$} 
	 $t\in \Brac{\frac{\log n}{400},\frac{\log n}{100}}$ and $A\geq 1000$ for any  $\Set{\cF_i, \Psi_i}_{i=1}^{z-1}$ the result follows. 	
	\end{proof}
\end{lemma}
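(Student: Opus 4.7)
The plan is to relate $\hat{U}_H(\mathbf{x})$ to $\overline{U}_H(\mathbf{x})$ pointwise for each $H\in\bsaw{s}{t}^c$ and then invoke \cref{lem:sbm-upperbound-negligible-walks}. First I would use \cref{def:sbm-upper-bound-centered-bsaw}, discarding the favorable denominator $2^{As\cdot \Card{\cW_1(H)}}\geq 1$, to obtain the crude pointwise bound
\[
\hat{U}_H(\mathbf{x}) \;\leq\; n^{1/(100A)}\cdot 2^{\Card{E_1^a(H)}} \cdot \overline{U}_H(\mathbf{x}).
\]

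Next, the key step would be to absorb the $2^{\Card{E_1^a(H)}}$ factor into the $(6/\epsilon)^{\max\{2 d_1^H(v)-\tau,\,0\}}$ penalties already built into $\overline{U}_H(\mathbf{x})$ for each $v\in\cL_1(H)$ (see \cref{def:upper-bound-UH}). Every annoying edge is incident to some vertex of $\cL_1(H)$, so $\Card{E_1^a(H)} \leq \sum_{v\in\cL_1(H)} d_1^H(v)$. Since $\epsilon\leq 2$ gives $\log(6/\epsilon)\geq \log 2$, and each $v\in\cL_1(H)$ satisfies $d_1^H(v)>\tau$, the per-vertex inequality $2^{d_1^H(v)}\leq (6/\epsilon)^{2 d_1^H(v)-\tau}$ holds (the exponent on the right is at least $d_1^H(v)$). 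Taking the product over $v\in\cL_1(H)$, one sees that $2^{\Card{E_1^a(H)}}$ is dominated by a piece of the truncation penalty already present inside $\overline{U}_H$, so up to a universal constant we get $\hat{U}_H(\mathbf{x})\leq n^{1/(100A)}\cdot \overline{U}_H(\mathbf{x})$.

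Summing over $H\in\nbsaw{s}{t}^c$ and applying \cref{lem:sbm-upperbound-negligible-walks} would then yield
\[
\sum_{H\in\nbsaw{s}{t}^c}\E\hat{U}_H(\mathbf{x}) \;\leq\; n^{1/(100A)}\cdot \Paren{n^{-1/6} + (1+\delta)^{-t\sqrt{s}}}\sum_{H\in\nbsaw{s}{t}}\E\overline{U}_H(\mathbf{x}).
\]
For $A\geq 1000$ and $t\in[\tfrac{\log n}{400},\tfrac{\log n}{100}]$ we have $n^{1/(100A)-1/6}\leq n^{-1/7}$ once $n$ is large, and $n^{1/(100A)}\cdot(1+\delta)^{-t\sqrt{s}}\leq (1+\delta)^{-t\sqrt{s}/2}$ by the quantitative choice of $s$, so the $n^{1/(100A)}$ loss is harmlessly swallowed and the claimed bound drops out.

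The main obstacle is the $2^{\Card{E_1^a(H)}}$ factor: the favorable ``centering savings'' $2^{As\Card{\cW_1(H)}}$ cannot be used to offset it in general, since for many negligible BSAWs $\cW_1(H)$ is small or empty (this is precisely what makes them contribute to $\nbsaw{s}{t}^c$ rather than cancel under centering). The absorption argument above bypasses this by exploiting the $(6/\epsilon)$-penalty structure intrinsic to $\overline{U}_H$, and is calibrated precisely to the choice $\tau = As\log(6/\epsilon)$ made in \cref{eq:deg1-threshold}; if this absorption turns out to be too tight in edge cases, the fallback is a finer case split over $\nbsaw{s}{t}^c$ stratified by $\Card{E_1^a(H)}$ and $\Card{\cW_1(H)}$.
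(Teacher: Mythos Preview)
Your absorption argument contains a logical gap. You correctly show that
\[
2^{|E_1^a(H)|}\;\leq\;\prod_{v\in\cL_1(H)}\Paren{\tfrac{6}{\epsilon}}^{2d_1^H(v)-\tau},
\]
and the right-hand side is indeed a factor sitting inside $\overline{U}_H(\mathbf{x})$. But from ``$A\leq B$ and $B$ is a factor of $\overline{U}_H$'' you cannot conclude $A\cdot\overline{U}_H\leq C\cdot\overline{U}_H$; that would require $A\leq C$, i.e.\ $2^{|E_1^a(H)|}=O(1)$. For $H\in\nbsaw{s}{t}^c$ with $\cW_1(H)=\emptyset$ (precisely the case you flag as the obstacle) there is no a priori bound on $|E_1^a(H)|$ better than $O(st)=O(\log n)$, so $2^{|E_1^a(H)|}$ can be polynomial in $n$. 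Thus the pointwise inequality $\hat{U}_H\leq n^{1/(100A)}\cdot\overline{U}_H$ is simply false for some negligible BSAWs, and summing the crude bound over $\nbsaw{s}{t}^c$ and invoking \cref{lem:sbm-upperbound-negligible-walks} as a black box does not give the lemma.

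What does work---and what the paper's terse proof is pointing to with its reference to the parameters $\{\cF_i,\Psi_i\}$---is to re-run the comparison machinery underlying \cref{lem:sbm-upperbound-negligible-walks} (namely \cref{lem:remove-overlapping-vertices} and \cref{lem:sbm-trace-main-technical-bound}) while carrying the extra weight $2^{|E_1^a(H)|}$. This succeeds for two structural reasons. First, in the ``unfolding'' step of \cref{lem:remove-overlapping-vertices} one has $|E_1^a(H')|\geq|E_1^a(H)|-\tau-2$, so under each transformation the extra factor changes by at most $2^{\tau+2}$, a constant absorbed by the $n^{-0.99}$ gain; the same boundedness holds through the geometric sums of \cref{lem:sbm-trace-main-technical-bound}. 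Second, for every $H\in\nbsaw{s}{t}$ one has $d_1^H(v)\in\{0,2\}$, hence $\cL_1(H)=\emptyset$ and $|E_1^a(H)|=0$, so the target sum on the right-hand side stays $\sum\E\overline{U}_H$ rather than acquiring its own $2^{|E_1^a|}$ weight. Your ``fallback'' stratification is in the right direction, but the actual mechanism is this transformation-stability, not a pointwise comparison.
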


So we only need to tackle nice walks. As in the proof of \cref{thm:truncation-schatten-norm-lower-bound} we get rid of a tiny fraction of nice block self-avoiding walks. 

\begin{lemma}\label{lem:sbm-contribution-bad-nice-walks-centered}
	Consider the settings of \cref{thm:sbm-schatten-norm}. Then for $n$ large enough
	\begin{align*}
		\underset{H \in \nbsaw{s}{t}}{\sum} \E \Brac{\hat{\mathbf{Y}}_H}\leq n^{-1/14}\underset{\substack{m,z\geq 0\\ st-2m+2z\geq t/\sqrt{A}}}{\sum}\quad \underset{H \in \nbsaw{s}{t, d^H\leq \Delta,m,z}}{\sum} \E \Brac{\overline{U}_H(\mathbf{x})}\,. 
	\end{align*}
	\begin{proof}
		By \cref{lem:count-nice-walks-large-degree} and \cref{lem:sbm-upper-bound-nice-bsaws-large-degree}, the contribution of nice block self-avoiding walks with maximum degree larger than $\Delta$ can be bounded by
		\begin{align*}
			\underset{m,z\geq 1}{\sum}\quad\underset{H \in \nbsaw{s}{t,m,z}\setminus\nbsaw{s}{t, d^H\leq \Delta}}{\sum} \E \Brac{\hat{\mathbf{Y}}_H}&\leq 	\underset{m,z\geq 1}{\sum}\quad \underset{H \in \nbsaw{s}{t,m,z}\setminus\nbsaw{s}{t, d^H\leq \Delta}}{\sum} \E \Brac{\hat{U}_H(\mathbf{x})}\\
			&\leq 2^t\cdot n^{-1/12}\cdot \underset{m,z\geq 1}{\sum} \quad\underset{H \in \nbsaw{s}{t,d^H\leq \Delta,m,z}}{\sum} \E \Brac{\hat{U}_H(\mathbf{x})}\,.
		\end{align*}
		By \cref{lem:count-nice-walks-few-edges-multiplicity-1} and \cref{lem:sbm-upper-bound-nice-bsaws-few-edges}, the contribution of nice block self-avoiding walks with at most $t/\sqrt{A}$ edges of multiplicity $1$ can be bounded by
		\begin{align*}
			&\underset{\substack{m,z\geq 1 s.t.\\ st-2m+2z\leq t/\sqrt{A}}}{\sum}\quad \underset{H \in \nbsaw{s}{t, d^H\leq \Delta, m,z}}{\sum} \E \Brac{\overline{\mathbf{Y}}_H}\\
			\quad&\leq \underset{\substack{m,z\geq 1 s.t.\\ st-2m+2z\leq t/\sqrt{A}}}{\sum}\quad \underset{H \in \nbsaw{s}{t, d^H\leq \Delta, m,z}}{\sum}  \E \Brac{\hat{U}_H(\mathbf{x})}\\
			\quad&\leq (6st)\cdot n^{1/100A}\cdot (2^{10}s)^t\cdot \Brac{\paren{1-o(1)}\paren{1+\delta}}^{-st/2}\underset{H \in \nbsaw{s}{t, d^H\leq \Delta, 0,0}}{\sum}  \E\Brac{\hat{U}_H(\mathbf{x})}\\
			\quad&\leq \paren{1+\delta}^{-st/3}\underset{H \in \nbsaw{s}{t, d^H\leq \Delta, 0,0}}{\sum}   \E\Brac{\hat{U}_H(\mathbf{x})}\,.
		\end{align*}
		Combining the two bounds with \cref{lem:sbm-centered-upper-bound-any-bsaw} the result follows.
	\end{proof}
\end{lemma}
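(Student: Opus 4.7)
The plan is to mirror the proof of \cref{lem:sbm-contribution-nice-walks} but work with the centered upper bound $\hat{U}_H(\mathbf{x})$ of \cref{def:sbm-upper-bound-centered-bsaw} in place of $\overline{U}_H(\mathbf{x})$, then exploit the extra $2^{-As\cdot\card{\cW_1(H)}}$ factor that $\hat{U}_H$ carries over $\overline{U}_H$ to get the polynomial saving $n^{-1/14}$. Throughout we apply \cref{lem:sbm-centered-upper-bound-any-bsaw}, which bounds $|\E[\hat{\mathbf{Y}}_H\mid \mathbf{x}]|\le \hat U_H(\mathbf{x})$, so the task reduces to a deterministic upper bound on $\sum_{H\in\nbsaw{s}{t}}\E\hat{U}_H(\mathbf{x})$.

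First I would split the sum over $\nbsaw{s}{t}$ according to the two ``bad'' regimes handled separately in \cref{lem:sbm-contribution-nice-walks}: nice walks in $\nbsaw{s}{t,m,z}\setminus\nbsaw{s}{t,d^H\le\Delta}$ (some vertex has degree exceeding $\Delta$) and nice walks in $\nbsaw{s}{t,d^H\le\Delta,m,z}$ with $st-2m+2z\le t/\sqrt{A}$ (too few multiplicity-one edges). For the first regime, combining the counting estimate \cref{lem:count-nice-walks-large-degree} with the individual expectation bound \cref{lem:sbm-upper-bound-nice-bsaws-large-degree} already gives a factor $2^t\cdot n^{-1/12}$ saving over the sum of $\E\overline{U}_H(\mathbf{x})$ restricted to $\nbsaw{s}{t,d^H\le\Delta,m,z}$, and since the additional $2^{|E_1^a(H)|}/2^{As\card{\cW_1(H)}}$ factor in $\hat{U}_H$ versus $\overline{U}_H$ is easily absorbed (we can trivially upper bound $\hat U_H\le n^{1/100A}\cdot 2^{|E_1^a(H)|}\cdot \overline{U}_H$, and for nice walks $|E_1^a(H)|=0$), this yields the desired $n^{-1/14}$ factor.

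For the second regime the strategy is to use \cref{lem:count-nice-walks-few-edges-multiplicity-1} and \cref{lem:sbm-upper-bound-nice-bsaws-few-edges}: the counting lemma says $\card{\nbsaw{s}{t,m,z}}\le 2^{10t}s^t\cdot n^{-m+z}\card{\nbsaw{s}{t,0,0}}$, while \cref{lem:sbm-upper-bound-nice-bsaws-few-edges} gives per-walk a factor $[(1-o(1))(1+\delta)]^{-m+z}\cdot n^{m-z}\cdot\overline{L}_{H'}$ for any $H'\in\nbsaw{s}{t,0,0}$. Summing the product over $m,z$ with $st-2m+2z\le t/\sqrt{A}$, which forces $m-z\ge st/2 - t/(2\sqrt{A})$, produces a geometric series dominated by $(1+\delta)^{-st/3}$, which is far smaller than any polynomial in $n$ and in particular beats $n^{-1/14}$. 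Together these two regimes cover $\nbsaw{s}{t}$ up to walks that already lie in $\nbsaw{s}{t,d^H\le\Delta,m,z}$ with $st-2m+2z\ge t/\sqrt A$, which are exactly the walks appearing on the right-hand side.

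The main obstacle I anticipate is making the bookkeeping precise in the first regime: the counting \cref{lem:count-nice-walks-large-degree} relates $\nbsaw{s}{t,m,z,\ell_1,\ell_2}$ to $\nbsaw{s}{t,d^H\le\Delta,m',z'}$ for \emph{shifted} parameters $m',z'$ with $m-z=m'-z'+\ell$, so to finally land inside the right-hand side of the lemma one must verify that this reindexing still lands in the regime $st-2m'+2z'\ge t/\sqrt{A}$, or alternatively feed the output of that step back into the regime-2 argument above. Handling this carefully and checking that the $n^{1/100A}$ prefactor hidden in $\hat U_H$ is dominated by the $n^{-1/12}$ gain (which is true because $A$ is a large constant and hence $1/100A\ll 1/12$) are the only real technical points; everything else is bookkeeping that assembles to the stated $n^{-1/14}$ bound.
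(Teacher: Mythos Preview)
Your proposal is correct and takes the same two-regime decomposition as the paper, invoking the identical lemmas in each regime. One small correction to your opening sentence: the $n^{-1/14}$ saving here comes entirely from the counting and per-walk estimates in the two regimes (\cref{lem:count-nice-walks-large-degree} with \cref{lem:sbm-upper-bound-nice-bsaws-large-degree}, and \cref{lem:count-nice-walks-few-edges-multiplicity-1} with \cref{lem:sbm-upper-bound-nice-bsaws-few-edges}), not from the $2^{-As\card{\cW_1(H)}}$ factor in $\hat U_H$---that factor plays no role in this lemma (for nice walks $E_1^a(H)=\varnothing$, so the crude bound $\hat U_H\le 3n^{1/100A}\,\overline U_H$ of \cref{lem:nice-walks-centered} is all that is needed) and is exploited only later in \cref{cor:bound-contribution-nnbsaw}.
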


\paragraph{Upper bound on the remaining self-avoiding walks}
Now we split the remaining walks in two sets. For one set, the contribution to the expectation of \cref{eq:sbm-trace-expansion-centered} will be roughly the same contribution to the expectation of \cref{eq:sbm-trace-expansion}. For the other, it will be considerably smaller. The catch is that in the expectation of \cref{eq:sbm-trace-expansion} the latter set has a significantly larger contribution.

\providecommand{\nnbsaw}[2]{\text{NNBSAW}_{#1,#2}}

\begin{definition}\label{def:nbsaw-negligible-when-centered}
	Denote by $\nnbsaw{s}{t}\subseteq \underset{\substack{m,z\geq 0\\ st-2m+2z\geq t/\sqrt{A}}}{\bigcup}\nbsaw{s}{t, d^H\leq \Delta, m,z}$ the set of nice self-avoiding walks $H$ such that 
	\begin{align*}
		\Card{\cW_{1}(H)}\geq \frac{t}{s}\,.
	\end{align*}
	Denote by $\nnbsaw{s}{t}^{c}$ the set $\Paren{\underset{\substack{m,z\geq 0\\ st-2m+2z\geq t/\sqrt{A}}}{\bigcup}\nbsaw{s}{t, d^H\leq \Delta, m,z}}\setminus\nnbsaw{s}{t}$. 
\end{definition}

Next we show how  the contribution of walks in $\nnbsaw{s}{t}, \nnbsaw{s}{t}^c$ changes from the expectation of \cref{eq:sbm-trace-expansion} to that of \cref{eq:sbm-trace-expansion-centered}. 

\begin{lemma}
	\label{lem:nice-walks-centered}
	Consider the settings of \cref{thm:sbm-schatten-norm}. Then for $n$ large enough
	\begin{align*}
		&\text{for every }H\in\nnbsaw{s}{t}\qquad &\E \hat{U}_H(\mathbf{x})\leq 2^{-st}\cdot \E \overline{U}_H(\mathbf{x})\,,\\
		&\text{for every }H\in\nbsaw{s}{t}\qquad &\E  \hat{U}_H(\mathbf{x}) \leq 3n^{\frac{1}{100A}}\cdot \E \overline{U}_H(\mathbf{x})\,.
	\end{align*} 
	\begin{proof}
		The first inequality follows by \cref{def:nbsaw-negligible-when-centered} and \cref{lem:sbm-centered-upper-bound-any-bsaw}.
		The second  by observing that for any $H \in \nbsaw{s}{t}$ we have  $E_{1}^a(H)=\emptyset$.
	\end{proof}
\end{lemma}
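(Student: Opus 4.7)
The plan is to obtain both inequalities by directly unpacking the definition of $\hat{U}_H(\mathbf{x})$ given in \cref{def:sbm-upper-bound-centered-bsaw}, namely
\[
\hat{U}_H(\mathbf{x}) \;=\; \frac{n^{1/100A}\cdot 2^{|E_1^a(H)|}}{2^{As\cdot |\cW_1(H)|}}\cdot \overline{U}_H(\mathbf{x}),
\]
and then using the structural restrictions on nice walks listed in \cref{lem:sbm-upperbound-negligible-walks}.

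The first observation I would record is that for every $H\in \nbsaw{s}{t}$ the set $E_1^a(H)$ is empty. Indeed, by definition of nice walks every vertex $v\in V(H)$ satisfies $d_1^H(v)\in\{0,2\}$; since $\tau = As\log(6/\epsilon)\geq 2$, this means $d_1^H(v)\leq \tau$, so no vertex is $1$-large (cf. \cref{def:deg1-classification}), hence $\mathcal{L}_1(H)=\emptyset$ and $E_1^a(H)=\emptyset$ by \cref{def:E1-E2-annoying}. With $|E_1^a(H)|=0$, the definition simplifies to
\[
\hat{U}_H(\mathbf{x}) \;=\; \frac{n^{1/100A}}{2^{As\cdot|\cW_1(H)|}}\cdot \overline{U}_H(\mathbf{x}).
\]
For the second inequality, simply using $|\cW_1(H)|\geq 0$ yields $\hat{U}_H(\mathbf{x})\leq n^{1/100A}\cdot\overline{U}_H(\mathbf{x})\leq 3 n^{1/100A}\cdot\overline{U}_H(\mathbf{x})$; taking expectations in $\mathbf{x}$ gives the claim.

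For the first inequality I would use the defining property of $\nnbsaw{s}{t}$ from \cref{def:nbsaw-negligible-when-centered}, namely $|\cW_1(H)|\geq t/s$, to obtain
\[
\hat{U}_H(\mathbf{x}) \;\leq\; \frac{n^{1/100A}}{2^{At}}\cdot \overline{U}_H(\mathbf{x}).
\]
It then remains to verify that $n^{1/100A}\leq 2^{(A-s)t}$, so that $\hat{U}_H(\mathbf{x})\leq 2^{-st}\overline{U}_H(\mathbf{x})$ and the bound follows by taking expectations. This last verification is a short numerical calculation: by the parameter choice $A\geq 1000 s^3$ we have $A-s\geq A/2$, and combined with the hypothesis $t\geq \log n/400$ from \cref{thm:sbm-schatten-norm} we get $(A-s)t\geq \tfrac{A\log n}{800}$, which dominates $\tfrac{\log n}{100A\log 2}$ as soon as $A^2$ exceeds a small absolute constant; this is trivially satisfied since $A\geq 1000$.

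There is no real obstacle here: once one notices that niceness forces $E_1^a(H)=\emptyset$, the statement reduces to substituting the bound on $|\cW_1(H)|$ into the definition and carrying out a one-line arithmetic check, so the proof will be short and essentially mechanical.
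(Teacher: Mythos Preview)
Your proposal is correct and follows essentially the same approach as the paper's proof: both arguments reduce to unpacking the definition of $\hat{U}_H(\mathbf{x})$, observing that niceness forces $E_1^a(H)=\emptyset$, and then invoking the bound $|\cW_1(H)|\geq t/s$ from \cref{def:nbsaw-negligible-when-centered} together with the parameter choices on $A$ and $t$. Your version simply spells out the short arithmetic verification that the paper leaves implicit.
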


As an immediate corollary, \cref{lem:nice-walks-centered} implies  that block self-avoiding walks in $\nnbsaw{s}{t}$ have small expectation.

\begin{corollary}\label{cor:bound-contribution-nnbsaw}
	Consider the settings of \cref{thm:sbm-schatten-norm}. Then for $n$ large enough
	\begin{align*}
		\underset{H\in \nnbsaw{s}{t}}{\sum}\E \Brac{\hat{\mathbf{Y}}_H}\leq 2^{-st/2}\underset{H \in \nnbsaw{s}{t}}{\sum} L_H\,.
	\end{align*} 
\end{corollary}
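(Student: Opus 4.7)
The plan is to obtain the bound by chaining three existing estimates term-by-term for each $H\in\nnbsaw{s}{t}$, and then summing. First, by the tower property combined with \cref{lem:sbm-centered-upper-bound-any-bsaw},
\[
\bigabs{\E[\hat{\mathbf{Y}}_H]}\;\leq\;\E_{\mathbf{x}}\bigabs{\E[\hat{\mathbf{Y}}_H\mid \mathbf{x}]}\;\leq\;\E_{\mathbf{x}}[\hat{U}_H(\mathbf{x})].
\]
Second, the first inequality of \cref{lem:nice-walks-centered} gives $\E[\hat U_H(\mathbf{x})]\leq 2^{-st}\E[\overline{U}_H(\mathbf{x})]$, which is precisely the contribution we need to exploit: the class $\nnbsaw{s}{t}$ was defined so that many generating walks $W$ have all of their vertices in $\mathcal S_1(H)\cap \mathcal S_{\geq 2}(H)$ and all their edges in $E_1(H)$, and each such walk contributes a factor $2^{-As}$ to the centering gap by \cref{def:sbm-upper-bound-centered-bsaw}.

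Third, I would pass from $\E[\overline U_H(\mathbf{x})]$ to $\overline L_H$. Since $\nnbsaw{s}{t}\subseteq\bigcup_{m,z}\nbsaw{s}{t,d^H\le\Delta,m,z}$, the definition of $\nbsaw{s}{t}$ forces $d_1^H(v)\in\{0,2\}$ and $d_{\geq 2}^H(v)\leq \Delta$, so in \cref{def:upper-bound-UH} one has $\mathcal L_{\geq 2}(H)=\emptyset$, $E_1^a(H)=\emptyset$, $E_{\geq 2}^b(H)=\emptyset$, and $\max\{2d_1^H(v)-\tau,0\}=0$ for every $v$. Consequently the only surviving factors in $\overline U_H(\mathbf{x})$ are the prefactor $2n^{1/50A}$, $(\epsilon d/2n)^{|E_1(H)|}$, and $\prod_{uv\in E_{\geq 2}(H)}\big[(1+\tfrac{\epsilon \mathbf{x}_u\mathbf{x}_v}{2})\tfrac{d}{n}+\tfrac{3d^2}{n\sqrt n}\big]$, and taking the expectation over the balanced $\mathbf{x}$ leaves the last product bounded by $(1+o(1))(d/n)^{|E_{\geq 2}(H)|}$. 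Comparing to $\overline L_H=\tfrac{1}{3n^{1/100A}}(\epsilon d/2n)^{|E_1(H)|}(d/n)^{|E_{\geq 2}(H)|}$ then yields $\E[\overline U_H(\mathbf{x})]\leq 10\,n^{1/50A}\,\overline L_H$.

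Summing the resulting term-wise estimate over $H\in\nnbsaw{s}{t}$,
\[
\sum_{H\in\nnbsaw{s}{t}}\E[\hat{\mathbf{Y}}_H]\;\leq\;10\,n^{1/50A}\cdot 2^{-st}\sum_{H\in\nnbsaw{s}{t}}\overline L_H,
\]
and the claim reduces to checking $10\,n^{1/50A}\cdot 2^{-st}\leq 2^{-st/2}$, i.e.\ $n^{1/50A}\leq 2^{st/2-4}$. With $t\geq \log n/400$ and the lower bound on $s$ imposed in \cref{alg:sbm-robust-recovery} (in particular $s$ a sufficiently large constant depending on $A$), the exponent $st/2$ dominates $1/(50A)\cdot \log_2 n$ by an arbitrarily large multiplicative factor, so the inequality holds for $n$ large enough.

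The routine but slightly delicate step is the third one, i.e.\ showing $\E[\overline U_H(\mathbf{x})]\lesssim n^{1/50A}\,\overline L_H$ for nice walks: this is not a size bound but a structural observation that collapses all the "safety" factors of \cref{def:upper-bound-UH} once the nice-walk hypotheses are in force. Everything else is bookkeeping that invokes \cref{lem:sbm-centered-upper-bound-any-bsaw} and \cref{lem:nice-walks-centered} directly.
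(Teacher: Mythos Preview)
Your approach is correct and matches the paper's (the paper merely declares the statement an ``immediate corollary'' of \cref{lem:nice-walks-centered} without spelling out the three-step chain you describe). One minor arithmetic slip: in step~3 the ratio $\E[\overline U_H(\mathbf{x})]/\overline L_H$ is $(1+o(1))\cdot 6\,n^{3/100A}$ rather than $10\,n^{1/50A}$, since $\overline L_H$ carries a factor $1/(3n^{1/100A})$; this is harmless for the final comparison $n^{O(1/A)}\cdot 2^{-st}\le 2^{-st/2}$. It would also be worth stating explicitly that the expectation over $\mathbf{x}$ in step~3 collapses to $\prod_{uv\in E_{\ge 2}(H)}(d/n+3d^2/(n\sqrt n))$ precisely because $E_{\ge 2}(H)$ is a forest for nice $H$, so every nonempty subset of edges has a leaf and hence zero expectation under the Rademacher labels.
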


It remains to bound the contribution to nice block self-avoiding walks in $\nnbsaw{s}{t}^c$.
We require an additional definition.
\begin{definition}
	Let $w,b,m,z\geq 0$ be integers. Define $\text{N}_{w,q,m,z}\subset\nnbsaw{s}{t}^c$ to be the set of nice block self-avoiding walks $H$ such that:
	\begin{itemize}
		\item the number of vertices $v$ with $d^H(v)\leq 1$ is $q$,
		\item the number of walks in $\mathcal{W}_1$ is $w$,
		\item $E_{\geq 2}(H)$ is a forest on $m$ edges and $z$ components.
	\end{itemize}
	Notice that by definition of $\nnbsaw{s}{t}^c$ the set $\text{N}_{q,w,m,z}$ is empty if $w\geq t/s$ or $m-z\leq t-2t/s$. Moreover, observe that $q<t$ since for any block self-avoiding walks all vertices with total degree $1$ must be pivots.%$q\leq t-2t/s$ or $m\leq t-2t/s$.
\end{definition}

By \cref{lem:nice-walks-centered}, we can upper bound the contribution of walks in $\text{N}_{w,q,m,z}$. To upper bound their number --and hence their total contribution-- we use the following two results.

\begin{lemma}\label{lem:nice-bsaw-upper-bound-m-z}
	Consider the settings of \cref{thm:sbm-schatten-norm}. Let $r\geq 0$ be an integer. For $n$ large enough
	\begin{align*}
		\underset{\substack{m,z\geq 0\\
		m-z=r}}{\sum}\Card{\nbsaw{s}{t,m,z}}\leq \Delta^{6t}n^{st-r}\,.
	\end{align*}
\end{lemma}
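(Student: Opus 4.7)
The strategy is to encode each nice block-self-avoiding walk $H\in\nbsaw{s}{t,m,z}$ (with $m-z=r$) by a pair (labeled skeleton, traversal), and bound the two factors separately.

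First I would record the rigid structural implications of niceness. By the definition of $\nbsaw{s}{t,m,z}$, every edge has multiplicity $1$ or $2$, the multiplicity-$1$ edges form a single cycle $E_1(H)$, and the multiplicity-$2$ edges form a forest $E_{\geq 2}(H)$ with $m$ vertices and $z$ components, each attached to $E_1(H)$ through a single anchor vertex. Counting vertices gives $|V(H)|=(st-2r)+(m-z)=st-r$, and counting distinct edges gives $|E(H)|=(st-2r)+r=st-r$. In particular, $G(H)$ is a cycle of length $\ell=st-2r$ decorated with $z$ rooted trees carrying $r$ edges in total.

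Second, I would encode $H$ by (a) the labels in $[n]$ assigned to the $st-r$ distinct vertices, contributing at most $n^{st-r}$, (b) the abstract skeleton shape (cyclic arrangement of cycle labels, choice of $z$ anchor positions, composition of $r$ tree edges among $z$ trees, and rooted tree topology at each anchor), and (c) the BSAW traversal itself, which is a closed Euler tour of $H$ starting at the first pivot $i_1$ whose decomposition into $t$ consecutive SAWs of length $s$ yields distinct pivots. Standard enumerations bound the shape-counting factors by $\binom{\ell}{z}\leq (st)^{z}$ for anchor positions, $\binom{r-1}{z-1}\leq 2^{r}$ for tree-size compositions, and $4^{r}$ by Catalan bounds for rooted tree topologies.

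Third, I would bound the traversal factor using the max-degree-$\Delta$ property of nice BSAWs. At every cycle vertex that is not an anchor the Euler tour has no branching choice (degree $2$ in $G(H)$), so all combinatorial choices live at the at-most-$m$ branch vertices (anchors and internal tree vertices). The crucial point is that the $t$-block SAW decomposition — each $W_\ell$ being a simple path of length $s$ — forces the tour to make only $O(1)$ branching decisions per SAW-block (essentially: which direction to go from the current pivot, and whether to enter/exit a tree through its anchor in the current block), with each decision selecting one of at most $\Delta$ options. This gives at most $\Delta^{O(t)}$ traversal choices per labeled skeleton, rather than the naive $\Delta^{O(m)}$. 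Combining the labeling, shape, and traversal factors and summing over $(m,z)$ with $m-z=r$ (using $z\leq r$ since each nontrivial tree has at least one edge) yields a bound that fits inside $\Delta^{6t}\cdot n^{st-r}$, because $\Delta$ is polynomially large in $(s,d,\log(1/\epsilon))$ by \cref{eq:Delta-form} and $s\geq \valueofs$.

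The main obstacle is the traversal count: a naive Euler-tour enumeration of a cycle-plus-forest structure scales as $\Delta^{O(m)}$, and $m$ can be as large as $2r\leq st$, which would overwhelm the $\Delta^{6t}$ budget. The plan therefore relies on the block-SAW rigidity — within one SAW no edge and no vertex may repeat, so each tree excursion is a single simple path from the anchor into the tree, allowing at most $\Delta^{O(1)}$ choices per block — to convert the $\Delta^{O(m)}$ factor into $\Delta^{O(t)}$. Getting this bookkeeping clean, so that the shape factors $(st)^{z}\cdot 8^{r}$ can also be absorbed into $\Delta^{O(t)}$ using $st\leq \Delta^{O(1)}$, is the delicate step.
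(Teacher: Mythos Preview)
Your overall encoding strategy (vertex labels $\times$ skeleton shape $\times$ traversal) matches the paper's, but the shape-counting step has a fatal gap. You bound the rooted tree topologies by a Catalan-type $4^{r}$, and then claim the combined shape factor $(st)^{z}\cdot 8^{r}$ can be absorbed into $\Delta^{O(t)}$ ``using $st\leq \Delta^{O(1)}$''. That last absorption is wrong: with $r$ as large as $st/2$, you would need $8^{s/2}\leq \Delta^{O(1)}$, i.e.\ $\Delta$ exponential in $s$. But by \cref{eq:Delta-form} (and $A\geq 1000s^3$, $\tau=As\log(6/\epsilon)$), $\Delta$ is only polynomial in $s$, so $\log\Delta = O(\log s)$ and $8^{r}$ is astronomically larger than $\Delta^{6t}$.

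The missing structural fact is that every leaf of the multiplicity-$2$ forest $E_{\geq 2}(H)$ must be a pivot of the BSAW (a leaf has $d_{\geq 2}^H=1$, so to be traversed twice it must begin or end some $W_\ell$), and there are only $t$ pivots. Hence the forest has at most $t$ leaves in total. The paper exploits this to bound the tree shapes by $(O(s))^{O(t)}$ rather than $4^{r}$: each tree $T_i$ contains at most $t_i+2$ full SAW blocks, so $m_i\leq s(t_i+2)$, and the number of rooted trees on $m_i$ vertices with $\leq t_i$ leaves is at most $\binom{2m_i}{2t_i}2^{2t_i}\leq (O(s))^{2t_i}$; multiplying over $i$ gives $(O(s))^{2t_F}$ with $t_F\leq t$. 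The same leaves-$\leq t$ constraint is what powers the traversal bound: summing $(\deg_{T_i}(v)-2)$ over branching vertices equals (number of leaves)$-2\leq t$ per component, so the total excess degree is $O(t)$ and the Euler-tour branching is $\Delta^{O(t)}$. Your ``$\Delta^{O(1)}$ per block'' heuristic is pointing at the right final bound but does not supply a mechanism; a single SAW block can pass through many branching vertices, so the per-block count is not obviously $\Delta^{O(1)}$ without the leaf-count argument.
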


We prove \cref{lem:nice-bsaw-upper-bound-m-z} in \cref{sec:counting-bsaw}.

\begin{lemma}\label{lem:counting-special-nbsaw-centered}
	Consider the settings of \cref{thm:sbm-schatten-norm}. Let $m,z\geq0$ be integers such that $0\leq m-z< \frac{t\sqrt{s}}{2}$. Then for $n$ large enough
	\begin{align*}
		\Card{\text{N}_{w,q, m,z}}\leq 2^{ws}\cdot  \Paren{1+\delta}^{\frac{t}{10}}\cdot n^{-q}\cdot \Card{\nbsaw{s}{t, d^H\leq \Delta, m',z'}}
	\end{align*}
	for some $m',z'$ such that $m'-z'=m-z-q$.
	Moreover, it holds that $q>t-2t/\sqrt{s}$.
\end{lemma}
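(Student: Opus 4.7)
The plan is to establish both claims by combining a structural observation about nice block self-avoiding walks with a combinatorial encoding argument. First, I will exploit the identity $|V(H)| = st - (m-z)$, which holds for every $H \in \nbsaw{s}{t,m,z}$: the cycle $E_1(H)$ has length $st - 2(m-z)$, the forest $E_{\geq 2}(H)$ contributes $m-z$ off-cycle vertices, and each tree is attached through exactly one vertex on the cycle (by the niceness condition). Thus the condition $m'-z' = m-z-q$ is precisely the statement that $|V(H')| = |V(H)| + q$, which explains the appearance of the factor $n^{-q}$ in the desired inequality: the reference class has roughly $n^q$ more configurations because its walks have $q$ additional vertices to place.

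For the structural bound $q > t - 2t/\sqrt{s}$, I will first observe that every vertex $v$ with $d^H(v)=1$ is automatically a pivot: since each generating walk is self-avoiding, an interior vertex must have degree $\geq 2$ in $H$. Moreover, if $v_i$ is such a leaf pivot with unique incident edge $e = (u,v_i)$, then both $W_{i-1}$ and $W_i$ must traverse $e$, forcing $m_H(e)=2$. Next I will use the condition $H \in \nnbsaw{s}{t}^c$, which means $w < t/s$, together with the hypothesis $m-z < t\sqrt{s}/2$. Since the total number of multi-$2$ edge traversals equals $2(m-z) < t\sqrt{s}$, while at least $t - w - |\mathcal{I}_{\geq 2}|\cdot s \approx t(1-1/s)$ walks must leave $E_1(H)$, the average walk excursion into a tree is extremely short (of order $\sqrt{s}$). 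A pigeonhole argument then shows that at most $2t/\sqrt{s}$ pivots can fail to be leaves: either they lie on $E_1(H)$ (bounded by $w$ plus a contribution from $\mathcal{I}_{\geq 2}$-vertices, of which there are $O((m-z)/\Delta)$), or they are non-leaf tree vertices (bounded by $m-z$ via the accounting $\sum_i a_i \leq q$ of attachment degrees shown along the way).

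For the counting inequality, I will construct a lift from each $H \in \text{N}_{w,q,m,z}$ to a collection of walks $H' \in \nbsaw{s}{t, d^H\leq \Delta, m', z'}$ parameterized by choices of $q$ pairs of fresh vertices in $[n] \setminus V(H)$. The lift processes each leaf $v = v_i$ with incident multi-$2$ edge $e$: remove $v$ and $e$, and replace the two uses of $e$ in $W_{i-1}$ and $W_i$ by a pair of new multi-$1$ cycle edges incident to two new vertices, thereby converting one multi-$2$ edge into two multi-$1$ cycle edges (keeping total length $st$ invariant by the equation $st = |E_1|+2|E_{\geq 2}|$). This decreases $|E_{\geq 2}|$ by $q$ and increases $|V|$ by $q$, yielding $m'-z' = m-z-q$. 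The number of distinct lifts per $H$ is at least $n^q / O(1)^q$ (accounting for the ordering of the $q$ leaves and orientations of the new cycle edges), which gives the $n^{-q}$ factor. The residual factor $2^{ws}$ appears because the $w$ walks of $\mathcal{W}_1(H)$ entirely on the cycle admit binary directional ambiguity (left/right around the cycle) per edge in the encoding, and the $(1+\delta)^{t/10}$ slack is a mild density correction coming from adjusting the prefactor $\left(\frac{\eps d}{2n}\right)^{|E_1|}$ as the cycle length changes by $2q$.

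The main obstacle will be verifying that the lift is compatible with the block structure --- specifically, that after inserting the two new cycle vertices in place of each leaf, the $t$ new generating walks each remain self-avoiding of length exactly $s$ and still decompose the resulting closed walk in the required way. This requires careful combinatorial bookkeeping: the SAW property might be violated if the two new vertices collide with existing cycle vertices, which is why fresh vertices in $[n]\setminus V(H)$ must be used; moreover, the orientation of $W_{i-1}$ and $W_i$ near $v_i$ forces a specific local structure on the replacement. A secondary obstacle lies in making the structural bound $q > t - 2t/\sqrt{s}$ rigorous in the presence of $\mathcal{I}_{\geq 2}$-vertices, which can be attachment points on the cycle and complicate the clean dichotomy between cycle-only walks and tree-entering walks.
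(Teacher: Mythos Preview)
Your overall strategy---unfold each degree-one pivot to land in a reference class with one fewer multiplicity-$2$ edge and one more vertex---is the right one, and is essentially the inverse of the paper's ``folding'' construction. However, two of your steps are genuinely broken.

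\textbf{The lift construction is inconsistent.} You propose to remove the leaf $v_i$ together with its incident edge $e=(u,v_i)$ and replace the two traversals of $e$ by ``a pair of new multi-1 cycle edges incident to two new vertices''. But $v_i$ is a pivot: it is simultaneously the terminal vertex of $W_{i-1}$ and the initial vertex of $W_i$. If you delete $v_i$ and extend $W_{i-1}$ to a new vertex $a$ and $W_i$ from a new vertex $b$, then $a$ and $b$ must coincide (they are both the $i$-th pivot of $H'$), so you can add only \emph{one} fresh vertex per leaf, not two. The correct unfold keeps $v_i$, adds a single fresh vertex $w$, and re-routes so that the local pattern $\ldots u,\,v_i,\,u,\,u'\ldots$ becomes $\ldots u,\,v_i,\,w,\,u'\ldots$ (this is exactly the inverse of the paper's fold). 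With two fresh vertices per leaf the identity $|V(H')|=st-(m'-z')$ would force $m'-z'=m-z-2q$, contradicting both your own claim and the statement of the lemma. Once corrected, the count of lifts is $\sim n^q$, giving the $n^{-q}$ factor.

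\textbf{The source of the slack factors is misidentified.} This is a purely combinatorial counting lemma; there is no prefactor $\bigl(\tfrac{\epsilon d}{2n}\bigr)^{|E_1|}$ anywhere, so your explanation of $(1+\delta)^{t/10}$ as a ``density correction'' cannot be right. In the paper the factor comes from bounding the number of ways a fixed $H'$ can arise from distinct $H$: one must recover which pivots of $H'$ were the $q$ former leaves and which $w$ walks lay in $\mathcal{W}_1(H)$, and the relevant binomial coefficients are bounded by $s^{O(t/s)}\le(1+\delta)^{t/10}$ using the constraint $w<t/s$. Your ``directional ambiguity'' story for $2^{ws}$ is likewise not what is happening.

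\textbf{The structural bound needs a sharper observation.} Your pigeonhole sketch is too vague to yield $q>t-2t/\sqrt{s}$. The paper introduces $q^*$, the number of pivots incident to some multiplicity-$2$ edge, and makes the clean observation that if such a pivot is \emph{not} a leaf then one of its two adjacent self-avoiding walks must lie entirely in $E_{\geq 2}(H)$, contributing $s$ edges. This gives $m-z\ge q+(q^*-q)\cdot s/2$. Since $q^*\ge t-w>t-t/s$ (otherwise $\mathcal{W}_1(H)$ would be too large and $N_{w,q,m,z}$ empty), the hypothesis $m-z<t\sqrt{s}/2$ forces $q^*-q<t/\sqrt{s}$, hence $q>t-2t/\sqrt{s}$. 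You have the ingredients but not this inequality, which is the crux.
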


%We also define\Tnote{Move into earlier sections}
%\begin{align*}
%	\nbsaw{s}{t, d^H\leq \Delta, m}:=\Set{H \in \nbsaw{s}{t, d^H\leq \Delta}\suchthat \Card{E_{\geq 2}(H)}=m }\,. 
%\end{align*}
%So in particular $	\nbsaw{s}{t, d^H\leq \Delta, 0}=\nbsaw{s}{t}\cap \cM(\emptyset)$.
We also defer the proof of \cref{lem:counting-special-nbsaw-centered} to \cref{sec:counting-bsaw}.
We can finally bound the expectation of nice block self-avoiding walks in $N_{w,q,m,z}.$

\begin{lemma}\label{lem:bound-special-nbsaw-centered}
	Consider the settings of \cref{thm:sbm-schatten-norm}. Then for $n$ large enough
	\begin{align*}
		\underset{w,q,m,z\geq 0}{\sum}\quad \underset{H \in N_{w,q,m,z}}{\sum} \E \Brac{\hat{U}_{H}(\mathbf{x})}\leq \paren{1+\delta}^{-t/3}\underset{\substack{m,z\geq 0\\ st-2m+2z\geq t/\sqrt{A}}}{\sum}\quad \underset{H \in \nbsaw{s}{t, d^H\leq \Delta, m,z}}{\sum} \E \Brac{\overline{U}_H(\mathbf{x})}
	\end{align*}
	\begin{proof}
		First consider the case $m-z\geq t\sqrt{s}/2$. Since 
		\[
			\underset{w,q}{\bigcup}\text{N}_{w,q,m,z}\subseteq\nbsaw{s}{t,d^h\leq \Delta,m,z}\,,
		\]
		 by \cref{lem:nice-bsaw-upper-bound-m-z} and definition of $\hat{U}(\mathbf{x})\,, \overline{U}(\mathbf{x})$ we get
		\begin{align*}
			\underset{\substack{m,z\geq 0\\
					m-z\geq t/2\sqrt{s}}}{\sum}\quad \underset{H \in \nbsaw{s}{t, d^H\leq \Delta,m,z}}{\sum} \E \Brac{\mathbf{Y}_H} &\leq
			\underset{\substack{m-z\\m-z\geq t/2\sqrt{s}}}{\sum}\quad \underset{H \in \nbsaw{s}{t, d^H\leq \Delta,m,z}}{\sum} \E \Brac{\hat{U}_H(\mathbf{x})} \\
			&\leq \Paren{1+\delta}^{-t\sqrt{s}/2}\cdot \Paren{\Delta}^{6t}\cdot (1+o(1))^{st}\cdot \underset{H \in \nbsaw{s}{t,0,0}}{\sum}\E \Brac{\hat{U}_H(\mathbf{x})} \\
			&\leq (1+\delta)^{-t/2}\underset{H \in \nbsaw{s}{t,0,0}}{\sum}\E \Brac{\overline{U}_H(\mathbf{x})}\,.
		\end{align*}
		So consider the case  $m-z< t\sqrt{s}/2$. By \cref{lem:counting-special-nbsaw-centered} then $q\geq t-2t/\sqrt{s}$. For any $H\in N_{w,q,m,z}$ and any $H'\in \nbsaw{s}{t, d^H\leq \Delta,m',z'}$ with $m'-z'=m-z-q$ we have
		\begin{align*}
				\E \Brac{\hat{U}_H(\mathbf{x})} &\leq 3n^{1/100A}\cdot 2^{-Asw}\cdot \Brac{(1+o(1))(1+\delta)}^{-q} \cdot n^q\cdot \E\Brac{\overline{U}_{H'}(\mathbf{x})}\\
				&\leq \Paren{1+\delta}^{-t/2}\cdot n^q\cdot \E\Brac{\overline{U}_{H'}(\mathbf{x})}\,.
		\end{align*}
		By \cref{lem:counting-special-nbsaw-centered} it follows
		\begin{align*}
			\underset{H \in N_{w,q,m,z}}{\sum}\E \Brac{\hat{U}_H(\mathbf{x})} %&\leq 3n^{1/100A}\cdot 2^{-Asw+sw}\cdot \Brac{(1+o(1))(1+\delta)}^{-t/10}\cdot  \underset{H' \in \nbsaw{s}{t,m-q}}{\sum} \E\Brac{\overline{U}_{H'}(\mathbf{x})}\\
			&\leq (1+\delta)^{-t/2}\underset{H' \in \nbsaw{s}{t,m',z'}}{\sum} \E\Brac{\overline{U}_{H'}(\mathbf{x})}\,.
		\end{align*}
		Repeating the argument for each $w,q,m,z\geq 0$ such that $N_{w,q,m,z}$ is non-empty (and thus so is the corresponding $\nbsaw{s}{t,m',z'}$) we obtain the desired result.
	\end{proof}
\end{lemma}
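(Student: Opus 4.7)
The set $\bigcup_{w,q,m,z} N_{w,q,m,z}$ lives inside $\nnbsaw{s}{t}^c$, so the strong bound $\E[\hat{U}_H(\mathbf{x})]\leq 2^{-st}\E[\overline{U}_H(\mathbf{x})]$ of \cref{lem:nice-walks-centered} is not available to us; only the weak bound $\E[\hat{U}_H(\mathbf{x})]\leq 3n^{1/100A}\E[\overline{U}_H(\mathbf{x})]$ applies. Since the weak bound by itself loses a polynomial factor rather than saving anything, the target decay $(1+\delta)^{-t/3}$ has to come from counting bounds combined with weight-ratio estimates. Writing $r:=m-z$ for the number of multiplicity-$\geq 2$ edges in $H$, I would split the argument according to whether $r$ is large or small: $r\geq t\sqrt{s}/2$ versus $r<t\sqrt{s}/2$.

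In the first regime ($r$ large) the inclusion $\bigcup_{w,q} N_{w,q,m,z}\subseteq \nbsaw{s}{t,d^H\leq\Delta,m,z}$ lets us use \cref{lem:nice-bsaw-upper-bound-m-z}, which gives $\sum_{m-z=r}|\nbsaw{s}{t,m,z}|\leq \Delta^{6t}n^{st-r}$, i.e.\ an $n^{-r}$ saving in count compared to the reference set $\nbsaw{s}{t,0,0}$. The weight ratio from \cref{lem:sbm-upper-bound-nice-bsaws-few-edges} contributes $n^{r}\cdot[(1-o(1))(1+\delta)]^{-r}$, so the two powers of $n$ cancel and we pocket $(1+\delta)^{-r}\leq (1+\delta)^{-t\sqrt{s}/2}$, easily dominating the $\Delta^{6t}\cdot n^{O(1/A)}$ overhead. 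In the second regime ($r$ small) the forest $E_{\geq 2}(H)$ is too sparse to power the argument on its own. Here the key input is \cref{lem:counting-special-nbsaw-centered}, which both guarantees $q>t-2t/\sqrt{s}$ (almost every pivot has degree $\leq 1$) and bounds $|N_{w,q,m,z}|\leq 2^{ws}(1+\delta)^{t/10}n^{-q}\cdot |\nbsaw{s}{t,d^H\leq\Delta,m',z'}|$ with $m'-z'=r-q$. Comparing a walk $H\in N_{w,q,m,z}$ to a walk $H'\in\nbsaw{s}{t,m',z'}$ via \cref{lem:sbm-upper-bound-nice-bsaws-few-edges} yields $\E[\overline{U}_H]\leq n^{q}[(1-o(1))(1+\delta)]^{-q}\E[\overline{U}_{H'}]$. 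Assembling the three ingredients gives
\begin{align*}
\sum_{H\in N_{w,q,m,z}}\E[\hat{U}_H(\mathbf{x})]\leq n^{O(1/A)}\cdot 2^{ws(1-A)}\cdot (1+\delta)^{-q+t/10}\sum_{H'\in\nbsaw{s}{t,m',z'}}\E[\overline{U}_{H'}(\mathbf{x})]\,,
\end{align*}
and since $A\geq 1000s^3$ and $q\geq t-2t/\sqrt{s}$ the right-hand side is at most $(1+\delta)^{-t/2}\sum_{H'}\E[\overline{U}_{H'}]$, more than enough.

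The main obstacle is Case~B, where the $n^{-q}$ from counting exactly cancels the $n^{q}$ from the weight ratio, so the entire exponential saving is squeezed out of the two $(1+\delta)$-factors; one has to make sure the parameters $(m',z')$ output by \cref{lem:counting-special-nbsaw-centered} are precisely those consumed by \cref{lem:sbm-upper-bound-nice-bsaws-few-edges}, and that the $2^{ws}$ from the counting is swallowed by the $2^{-As|\cW_1(H)|}$ baked into $\hat{U}_H$ (this uses $A\gg 1$ together with $w=|\cW_1(H)|$). A polynomial number of tuples $(w,q,m,z)$ then survives the final summation, easily absorbed into the $(1+\delta)^{-t/3}$ slack.
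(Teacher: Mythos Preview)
Your proposal is correct and follows essentially the same route as the paper's proof: the identical case split on $m-z\gtrless t\sqrt{s}/2$, the same pairing of \cref{lem:nice-bsaw-upper-bound-m-z} with \cref{lem:sbm-upper-bound-nice-bsaws-few-edges} in the large-$r$ regime, and the same combination of \cref{lem:counting-special-nbsaw-centered} with the weight-ratio estimate in the small-$r$ regime. If anything, you are slightly more explicit than the paper about the cancellation between the $2^{ws}$ overhead from the counting lemma and the $2^{-Asw}$ built into $\hat U_H$ (the paper absorbs this silently when passing from the first displayed bound on $\E[\hat U_H]$ to the second).
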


We are now ready to prove \cref{thm:sbm-expectation-trace-centered}.

\begin{proof}[Proof of \cref{thm:sbm-expectation-trace-centered}]
	By \cref{lem:negligible-bsaw-still-negligible} and \cref{lem:sbm-contribution-bad-nice-walks-centered}
	\begin{align*}
		\underset{H \in \bsaw{s}{t}\setminus \Paren{\nnbsaw{s}{t}\cup\nnbsaw{s}{t}^c}}{\sum} \E \Brac{\hat{U}_H(\mathbf{x})} \leq \Paren{n^{-1/15}+(1+\delta)^{-t\sqrt{s}/2} }\underset{H\in \bsaw{s}{t}}{\sum} \E \Brac{\overline{U}_H(\mathbf{x})}\,.
	\end{align*}
	By \cref{cor:bound-contribution-nnbsaw} and \cref{lem:bound-special-nbsaw-centered}
	\begin{align*}
		\underset{H \in \nnbsaw{s}{t}\cup \nnbsaw{s}{t}^c}{\sum} \E \Brac{\hat{U}_H(\mathbf{x})}\leq \Paren{2^{-st}+(1+\delta)^{t/3}}\underset{H \in \bsaw{s}{t}}{\sum}\E \Brac{\overline{U}_H(\mathbf{x})}\,.
	\end{align*}
	Putting the two inequalities together, by \cref{lem:sbm-upper-bound-nice-bsaws-few-edges} we get
	\begin{align*}
		\E \Brac{\Tr\Paren{\Q(\overline{\mathbf{Y}})-\dyad{\mathbf{x}}}^t}\leq (1+\delta)^{-t/4}\cdot \E \Brac{\Tr\Paren{\Q(\overline{\mathbf{Y}})}^t}
	\end{align*}
	as desired.
\end{proof}

\subsection{Concentration of block self-avoiding walks}\label{sec:concentration-bsaws}
We prove here \cref{thm:sbm-bound-second-moment-large-t}. 
Our strategy will be similar to the one used for \cref{thm:sbm-schatten-norm}.
Concretely, we will show that for any $u,v \in [n]$ most of the mass of $\E \Brac{\Paren{\Q(\overline{\mathbf{Y}})^t}_{uu}\Paren{\Q(\overline{\mathbf{Y}})^t}_{vv}}$ comes from a specific subset of \textit{nice} block self-avoiding walks. 

\subsubsection{Multigraphs that are not nice have negligible contributions}\label{sec:multigraphs-negligible}
In this section,  for $u,v \in [n]$, we show which multigraphs  have negligible contribution in $\E \Brac{\Paren{\Q(\overline{\mathbf{Y}})^t}_{uu}\Paren{\Q(\overline{\mathbf{Y}})^t}_{vv}}$.
We need the following definitions.

\begin{definition}[Block self-avoiding walks with fixed pivot]
	For $u\in [n]$, let $\bsaw{s}{t,u}\subseteq\bsaw{s}{t}$ be the set of block self-avoiding walks with $u$ as pivot. We think of $u$ as the first (and last) vertex and we will refer to it as the \textit{first pivot}. Notice that the set $\bsaw{s}{t,u}$ corresponds to the set of block self-avoiding walks arising in $\E \Brac{\Paren{\Q(\overline{\mathbf{Y}})^t}_{uu}}$. For every  $H \in \bsaw{s}{t,u}$ we write $M(\cW(H))$ for its sequence of edges. We denote by $e^H_1,e^H_{st}$ respectively the first and last edges in the sequence $M(\cW(H))$ and write $M^{u}(H):=\Set{e^H_1,e^H_{st}}$. By construction both $e^H_1,e^H_{st}$ are incident to vertex $u$.
	Similarly, we define $\nbsaw{s}{t,u}=\bsaw{s}{t,u}\cap \nbsaw{s}{t}$.
\end{definition}

\begin{definition}[Decomposition block self-avoiding walks]
	For $u,v\in [n]$ and  a multigraph $H\in \bsaw{s}{t,u}\times \bsaw{s}{t,v}$ we denote by $\Ho\in \bsaw{s}{t,u}$ and $\Ht\in \bsaw{s}{t,v}$ the two block self-avoiding walks such that $H=\Ho\oplus \Ht$. We call $\Ho,\Ht$ the \textit{decomposition block self-avoiding walks} of $H$.
	We write $\Mo(H)$ for $M^u(\Ho)$ and $\Mt(H)$ for $M^v(\Ht)$. When the context is clear we simply write $\Mo, \Mt$. 
\end{definition}

Our central tool will be the following lemma, which resemble \cref{lem:sbm-upperbound-negligible-walks}.

\begin{lemma}\label{lem:sbm-upperbound-negligible-multigraphs}
	Consider the settings of \cref{thm:sbm-bound-second-moment-large-t}. Let $u,v \in [n]$ and let $\nmultig{s}{t,u,v}$ be the set of multigraphs $H$  in $\nbsaw{s}{t,u}\times \nbsaw{s}{t,v}$ with the following structure.
	If $u\neq v$:
	\begin{itemize}
		\item $V(\Ho)\cap V(\Ht)=\emptyset$. That is, the decomposition block self-avoiding walks of $H$ are disjoint.
	\end{itemize}
	If $u=v$ 
	\begin{itemize}
		\item The edges of multiplicity 2 form a forest, i.e., $E_{\geq 2}(H)$ is a forest.
		\item For each $v\in V(H)$, $d^H_{\geq 2}(v)\leq \Delta$.
		\item Each connected component $B$ of $E_{\geq 2}(H)$  not satisfying:
		\begin{align*}
			\Mo\cup \Mt&\subseteq E(H(V, V\setminus B)\oplus H(B))\,,\\
			u&\in V(B^{uu})\,,
		\end{align*}
		is connected to $E_1(H)$ through a single vertex, every edge in $M(H(B))$ satisfies $m_H(e)\leq 2\,.$ 
		\item If there is a connected component $B^{uu}$ of $E_{\geq 2}(H)$
		satisfying:
		\begin{align*}
			\Mo\cup \Mt&\subseteq E(H(V, V\setminus B)\oplus H(B))\,,\\
			u&\in V(B^{uu})\,,
		\end{align*}
		then $H(B^{uu})$ is connected to $E_1(H)$ by  $4$ edges and at most two vertices.%, every edge in $M(H(B^{uu}))$ satisfies $m_H(e)\leq 4$.
		\item  If $E_1(H)$  contains two connected components than these components are cycles. If $E_1(H)$ is connected then it is either a path, or a cycle, or a path connected to a cycle by one of its endpoints, or two cycles with a single vertex in common.
		%\item There is at most one vertex $z$ in $H(B^{uu})$ with $d_1^H(z)=4\,, d_{\geq 2}^H(z)\leq \Delta$, every other vertex  $w\in V(H)$ satisfies $d_{1}^H(w)\in \Set{0,2}\,, d_{\geq 2}^H(w)\leq \Delta$.
	\end{itemize}
	Then for $n$ large enough 
	\begin{align*}
		%\Abs{\underset{H \in\nbsaw{s}{t}^c}{\sum} \E \overline{U}_H(\mathbf{x})}\leq  \Brac{\Paren{\frac{300\Delta+s^{10}+2^{\frac{\log n}{t}}}{(1+\delta)^{s/4}}}^{\phi \cdot t/s}+n^{-1/6}}\cdot \underset{H \in \nbsaw{s}{t}}{\sum}\E \overline{U}_{H}(\mathbf{x})\,.
		\underset{H \in\nmultig{s}{t,u,v}^c}{\sum} \E \overline{U}_H(\mathbf{x})\leq \Paren{n^{-\frac{1}{6}} +\Paren{1+\delta}^{-t\sqrt{s}}}\underset{H \in \nmultig{s}{t,u,v}}{\sum}\E \overline{U}_{H}(\mathbf{x})\,.
	\end{align*}
	%Let 
	%\begin{align}\label{eq:phi}
	%	\phi:= \Paren{ \log\frac{10\Delta}{\eps}}^{-1}\,,
	%\end{align}
	%and 
\end{lemma}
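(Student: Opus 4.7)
The strategy closely parallels the proof of \cref{lem:sbm-upperbound-negligible-walks}, but applied to the product structure $\bsaw{s}{t,u}\times\bsaw{s}{t,v}$. First, observe that any $H=\Ho\oplus\Ht$ in the product can be viewed as a single multigraph on $V(\Ho)\cup V(\Ht)$ whose edge multiset is $M(\Ho)\oplus M(\Ht)$. Hence the machinery developed in \cref{sec:upperbound-any-multigraph}, \cref{sec:upperbound-negligible-bsaw} applies verbatim: the upper bound $\overline U_H(\mathbf{x})$ of \cref{def:upper-bound-UH} still controls $|\E[\overline{\mathbf{Y}}_H\mid\mathbf{x}]|$ via \cref{lem:sbm-upper-bound-any-bsaw}, and the overlapping-vertex reduction of \cref{lem:remove-overlapping-vertices} together with the main technical bound \cref{lem:sbm-trace-main-technical-bound} can be invoked on $H$ regarded as an element of a suitable enlarged ``block walk'' family.

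The first step is to handle the case $u\neq v$. Partition $\bsaw{s}{t,u}\times\bsaw{s}{t,v}$ according to whether $V(\Ho)\cap V(\Ht)=\emptyset$. If the two walks are disjoint, then each factor independently must be nice (else it already belongs to $\nbsaw{s}{t}^c$ and \cref{lem:sbm-upperbound-negligible-walks} applies to that factor), and the product of the two bounds gives $H\in\nmultig{s}{t,u,v}$. If on the other hand $V(\Ho)\cap V(\Ht)\neq\emptyset$, then $H$ contains at least one shared vertex, equivalently at least one element of $\cD_{\geq 1}$-type overlap across the two blocks, and the very same contraction argument of \cref{lem:remove-overlapping-vertices} gains a factor $n^{-1+o(1)}$ per shared vertex; iterating removes the entire intersection and yields the desired $n^{-1/6}$ savings relative to the disjoint case.

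The second step, which is the technical heart of the proof, handles $u=v$. Here the two decomposition walks must share at least one vertex (namely $u$), so we cannot simply require disjointness. Instead, we apply \cref{lem:sbm-trace-main-technical-bound} to the combined multigraph with the collection $\cB$ of connected components of $E_{\geq 2}(H)$ playing the role of the ``blocks'' $B_1,\ldots,B_z$. The lemma shows that any $B_i$ containing a cycle, or attached to $E_1(H)$ by $\ell_i\geq 3$ edges, or carrying an edge of multiplicity $\geq 3$, or having $(\geq 2)$-degree exceeding $\Delta$, contributes a factor $n^{-1/5}$ relative to the leading term. This forces every component $B_i$ of $E_{\geq 2}(H)$ to be a tree attached to $E_1(H)$ by a single vertex, \emph{except} possibly for one distinguished component $B^{uu}$ that must absorb the four pivot-incident edges $\Mo\cup\Mt$; this exceptional component is attached by at most two vertices and four edges, exactly as in the statement. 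Finally, the remaining constraint on $E_1(H)$ (being a path, a cycle, a cycle with an attached path, or two cycles meeting at a point) is forced by the fact that $H$ decomposes into two closed walks through $u$: the symmetric-difference degree condition at $u$ together with the Eulerian condition on each factor leaves only these topological possibilities, and any other configuration creates an extra vertex with $d_1^H(v)\geq 4$ which again falls under $\cD_{\geq 1}$ and hence gains $n^{-1/2}$.

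The main obstacle is the bookkeeping in the $u=v$ case, specifically verifying that the exceptional component $B^{uu}$ can always be singled out and that the resulting constraints on $E_1(H)$ are exhaustive. Concretely, one must check that after removing the $\cD_{\geq 1}$ overlaps and enforcing the tree-with-single-attachment structure for ordinary components, the residual freedom in $E_1(H)$ is exactly the list of four topological shapes stated in the lemma; this is a small case analysis on how two closed walks rooted at $u$ can share edges. Once this combinatorial classification is in place, the quantitative bound follows by the same geometric-series argument as in the proof of \cref{lem:sbm-upperbound-negligible-walks}, using the parameter choice $s\geq\valueofs$ to absorb $(200\Delta+s^{10}+2^{\log n/t})^{2t}$ into $(1+\delta)^{-t\sqrt{s}}$.
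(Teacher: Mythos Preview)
Your overall strategy matches the paper's: reduce to an overlap-removal step and a ``shape/multiplicity'' classification via the main technical bound, then do a case analysis to isolate the exceptional component $B^{uu}$ and the four admissible topologies for $E_1(H)$. However, your claim that the single-walk tools \cref{lem:remove-overlapping-vertices} and \cref{lem:sbm-trace-main-technical-bound} ``apply verbatim'' is where your plan diverges from what the paper actually does, and this is not just a cosmetic point.

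The paper does \emph{not} reuse those lemmas directly; it proves product-specific analogues (\cref{lem:remove-overlapping-vertices-multigraphs} and \cref{lem:sbm-second-moment-main-technical-bound}) whose statements and proofs differ in ways that matter. For the overlap reduction, the vertex $u$ (when $u=v$) is legitimately visited twice as the start/end of \emph{both} decomposition walks, so one cannot simply split it as in \cref{lem:remove-overlapping-vertices}; the new Definition~\ref{def:overlapping-vertices-multigraphs} raises the threshold for $u$ to $q_H(u)\geq 3$ and the contraction must avoid touching edges in $\Mo\cup\Mt$. For the main technical bound, the single-walk version only tolerates $\ell_i\leq 2$ per block, whereas the exceptional block $B^{uu}$ must absorb four cut edges, and its multiplicity-$\geq 3$ edges can arise from two multiplicity-$2$ contributions of $\Ho$ and $\Ht$; the paper's \cref{lem:sbm-second-moment-main-technical-bound} explicitly adds a distinguished block $B_{z+1}=B^{uv}$ with threshold $\ell_{z+1}\leq 4$ and relaxes the $p_{z+1}\geq 1$ penalty accordingly, together with a new counting lemma \cref{lem:sbm-counting-product-bsaws} that pays an extra factor $n^{\ind{u=v}\ind{B^{uv}\neq\emptyset}}$. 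Without these adaptations, your plan would either wrongly penalize the genuine $B^{uu}$ configurations by $n^{-1/5}$ (since $\ell_{z+1}=4>2$) or fail to produce valid walks in the overlap step. Once you state and prove these two product-specific lemmas, the remainder of your case analysis and the geometric-series conclusion are exactly as in the paper.
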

Multigraphs  in $\nmultig{s}{t,u,v}$ are said to be \textit{nice}. Multigraphs in $\nmultig{s}{t,u,v}^c$ are said to be \textit{negligible}.
It is important to observe that for $u=v$ the family of multigraphs we need to consider grows.
However, since $\nmultig{s}{t,u,u}\subseteq \nbsaw{s}{t,u}\times \nbsaw{s}{t,u}$ we can still ensures nice multigraphs satisfy several useful properties. For example, in every $H\in \nmultig{s}{t,u,u}$ edges have multiplicity at most $4$, moreover no vertex  $v\in V(H)$ has degree-$1$ larger than $4$.

\paragraph{Bounding negligible multigraphs with few vertices} 
The first step to prove \cref{lem:sbm-upperbound-negligible-multigraphs} is to obtain a result similar in spirit to \cref{lem:remove-overlapping-vertices}. %we will use results similar in spirit to \cref{lem:sbm-trace-main-technical-bound}.
%Here we bound from above  the contribution to $\E \Brac{\Paren{\Q(\overline{\mathbf{Y}})^t}_{uu}\Paren{\Q(\overline{\mathbf{Y}})^t}_{vv}}$ of certain multigraphs in $\bsaw{s}{t,u}\times \bsaw{s}{t,v}$ for which the approach of \cref{lem:sbm-second-moment-main-technical-bound} is not tight. 
For $u,v\in [n]$ define 
\begin{align*}
	\Paren{\bsaw{s}{t,u}\times \bsaw{s}{t,v}}_{d_{\geq 2}^H\leq \Delta}:=\Set{H\in \bsaw{s}{t,u}\times \bsaw{s}{t,v}\suchthat \max_{v\in V(H)}d_{\geq 2}^H(v)\leq \Delta}\,.
\end{align*}

\begin{definition}\label{def:overlapping-vertices-multigraphs}
	Let $u,v\in [n]$.
	Let $H\in 	\Paren{\bsaw{s}{t,u}\times \bsaw{s}{t,v}}_{d_{\geq 2}^H\leq \Delta}$ and let $w\in V(H)$. We denote by $q_H(w)$ the number of connected components of the line graph  with vertex set $E_H(w)$ and such that there is an edge between $e,e'\in E_H(w)$ if and only if $e,e'$ appear in the sequence of edges $M(\cW(H))$ consecutively. 
	We define  
	\begin{align*}
		q_H:=\Paren{q_H(v)+q_H(u)\ind{u\neq v}-2}+\underset{w\in V(H)\setminus\Set{u,v}}{\sum}(q_H(w)-1)\,.
	\end{align*}
	Now, for $q\geq 0$ we define $\cD_{q,s,t,u,v}\subseteq	\Paren{\bsaw{s}{t,u}\times \bsaw{s}{t,v}}_{d_{\geq 2}^H\leq \Delta}$ to be the subset of multigraphs $H$ with $q_H=q\,.$
	We also write $\cD_{q,s,t,u,v}(H)\subseteq V(H)$ to be the set of vertices $w$ in $H$ with
	\begin{align*}
		q_H(w)\geq 
		\begin{cases}
			3 &\text{if $u=v=w$}\\
			2 &\text{otherwise}\,.
		\end{cases}
	\end{align*} 
	Finally we write
	\begin{align*}
		\cD_{\geq 1,s,t,u,v} = \underset{q\geq 1}{\bigcup} \cD_{q,s,t,u,v}\,.
	\end{align*}
	When the context is clear we write $ \cD_{q,u,v}$ instead of $\cD_{q,s,t,u,v}$.
\end{definition}
We prove that multigraphs in $\cD_{\geq 1,u,v}$ have negligible contribution to the expectation of $\Paren{\Q(\overline{\mathbf{Y}})^t}_{uu}\Paren{\Q(\overline{\mathbf{Y}})^t}_{vv}$.

\begin{lemma}\label{lem:remove-overlapping-vertices-multigraphs}
	Consider the settings of \cref{thm:sbm-bound-second-moment-large-t}. Let $u,v\in[n]$.
	Then for $n$ large enough 
	\begin{align*}
		\underset{H \in \cD_{\geq 1,u,v}}{\sum} \E\overline{U}_H(\mathbf{x})\leq \frac{1}{n^{2/3}} \underset{H \in \Paren{\bsaw{s}{t,u}\times \bsaw{s}{t,v}}_{d_{\geq 2}^H\leq \Delta}\setminus \cD_{\geq 1,u,v}}{\sum} \E\overline{U}_H(\mathbf{x})\,.
	\end{align*}
	\begin{proof}
		Fix $q\geq 1$ and consider the following procedure  to obtain a multigraph in $\Paren{\bsaw{s}{t,u}\times \bsaw{s}{t,v}}_{d_{\geq 2}^H\leq \Delta}\setminus\cD_{q,u,v}$ from a multigraph $H\in \cD_{q,u,v}$. 
		Let $M^{(1)}(\cW), M^{(2)}(\cW)$ be respectively the sequence of edges obtained concatenating the generating self-avoiding walks of $\Ho$ and $\Ht$. We write  $\Set{e_{(1)}^{(\ell-1)\cdot s+1},\ldots, e_{(1)}^{(\ell-1)\cdot s+s}}$ for the subsequence corresponding to the  $\ell$-th generating self-avoiding walk of $\Ho$ (for simplicity we let $i-1=st$ for $i=1$ and analogously we let $i+1=1$ for $i=st$). We denote the generating self-avoiding walks of $\Ht$ similarly.
		Let $w\in \cD_{q,u,v}(H)$. Let $F_{H,w}$ be the line graph with vertex set $E_H(w)$ and edges as described in \cref{def:overlapping-vertices-multigraphs}.
		If $w\notin\Set{u,v}$, let $E_H(w)^1$ be an arbitrary connected component of $F_{H,w}$ and let $z$ be a vertex not in $H$.  Conversely if $w=u$  let $E_H(w)^1$ be an arbitrary connected component of $F_{H,w}$ such that $E_H(w)^1\cap \Mo(H)=\emptyset$, since $w\in \cD_{q,u,v}(H)$ there must exists such connected component (Notice that this also covers the case $u=v$). Analogously, if $w=v$  let $E_H(w)^1$ be an arbitrary connected component of $F_{H,w}$ such that $E_H(w)^1\cap \Mt(H)=\emptyset$, since $w\in \cD_{q,u,v}(H)$ %its connected components
		%with edge partition $E^1_H(v),E^2_H(v)$ as defined in \cref{def:overlapping-vertices} and let $u$ be a vertex not in $H$.
		We construct the multigraph $H'\in \cD_{q-1,u,v}$  with $V(H')=V(H)\cup\Set{z}$ applying the following operation on $H$: 
		\begin{itemize}
			\item Consider the sequence of edges $M^{(1)}(\cW), M^{(2)}(\cW)$, we replace  every edge $w' w\in M^{(1)}(\cW)$ (and $ww'\in M^{(1)}(\cW)$) such that $w' w\in E^1_H(w)$ with the edge $w'z$ (resp. $zw'$). Similarly, we replace  every edge $w' w\in M^{(2)}(\cW)$ (and $ww'\in M^{(2)}(\cW)$) such that $w' w\in E^1_H(w)$ with the edge $w'z$ (resp. $zw'$).
		\end{itemize}
		Clearly, $H'\in \Paren{\bsaw{s}{t,u}\times \bsaw{s}{t,v}}_{d_{\geq 2}^H\leq \Delta}\setminus\cD_{q,u,v}$ and   $\Card{V(H')}-\Card{V(H)} = 1$. Furthermore $\Card{E_1^a(H')}\geq \Card{E^a_1(H)}-\tau-2$. Thus 
		\begin{align*}
			\underset{v\in V(H)}{\prod}\left(\frac{6}{\epsilon}\right)^{\max\Set{2d_1^H(v)-\tau,0}}\leq& \Paren{\frac{6}{\eps}}^{\tau+2}\cdot \underset{v\in V(H')}{\prod}\left(\frac{6}{\epsilon}\right)^{\max\Set{2d_1^{H'}(v)-\tau,0}}\,.
		\end{align*}
		By  \cref{fact:sbm-upperbound-removing-cycles} it follows that
		\begin{align}
			\label{eq:decrease-overlapping-vertices-multigraphs}
			\frac{1}{n^{\Card{V(H')}-\Card{V(H)}}}\cdot \frac{\E\Brac{ \overline{U}_H(\mathbf{x})}}{\E \Brac{\overline{U}_{H'}(\mathbf{x})}}\leq \frac{1}{n}\cdot \Paren{\frac{6}{\eps}}^{2\tau}\Paren{1+\frac{\eps}{2}}^{2\Delta}\leq \frac{1}{n}\cdot \Paren{\frac{12}{\eps}}^{3\Delta}\,.
		\end{align}
		To obtain a multi-graph not in $\cD_{\geq 1,u,v}$ we repeatedly apply the operation above until $\cD_{q,u,v}(H)$ is empty. Notice that $(st)^{O(q)}$  applications suffice.
		It remains to show that the contribution to the expectation of $\Paren{\Q(\overline{\mathbf{Y}})^t}_{uu}\Paren{\Q(\overline{\mathbf{Y}})^t}_{vv}$ of multigraphs in $\cD_{\geq 1,u,v}$is negligible. For this,  observe that at each step there are at most $(st)^4$ multigraphs in $\Paren{\bsaw{s}{t,u}\times \bsaw{s}{t,v}}_{d_{\geq 2}^H\leq \Delta}$ that can produce the same multigraph $H'$.
		So using \cref{eq:decrease-overlapping-vertices-multigraphs}, we get for any $q\geq 1$
		\begin{align*}
			\underset{H\in \cD_{q,u,v}}{\sum}\E \overline{U}_H(\mathbf{x}) \leq \frac{(st)^4}{n^{0.99}}\underset{H' \in \cD_{q-1,u,v}}{\sum}\E \overline{U}_H(\mathbf{x})\,.
		\end{align*}
		The result follows since the maximum degree in any multigraphs in $\Paren{\bsaw{s}{t,u}\times \bsaw{s}{t,v}}_{d_{\geq 2}^H\leq \Delta}$ is $4t$.
	\end{proof}
\end{lemma}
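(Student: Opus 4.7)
The statement is the two-walk analogue of \cref{lem:remove-overlapping-vertices}, and I would adapt that proof essentially verbatim. The core idea is to define, for each $q \ge 1$, a ``vertex-splitting'' map from $\cD_{q,u,v}$ to $\cD_{q-1,u,v}$ that introduces one new vertex at the cost of a multiplicative factor bounded by $\frac{1}{n}\cdot \paren{\frac{12}{\epsilon}}^{3\Delta}$ in the ratio of expectations. Iterating this map $O((st)^q)$ times will reduce any $H \in \cD_{\ge 1,u,v}$ to a representative outside $\cD_{\ge 1,u,v}$, and the resulting geometric sum will be dominated by a single $n^{-2/3}$ factor once we divide by the pre-image multiplicity.

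Concretely, given $H \in \cD_{q,u,v}$, pick some $w \in \cD_{q,u,v}(H)$. Consider the line graph $F_{H,w}$ on $E_H(w)$ whose edges record consecutive appearances in the two concatenated edge sequences $M^{(1)}(\cW)$ and $M^{(2)}(\cW)$ of $H_{(1)}$ and $H_{(2)}$, respectively. Choose a connected component $E_H^1(w)$ of $F_{H,w}$ and a fresh vertex $z \notin V(H)$; replace every occurrence of an edge $w'w \in E_H^1(w)$ in either sequence by $w'z$ to obtain $H' \in \cD_{q-1,u,v}$ with $|V(H')| = |V(H)|+1$. Exactly as in \cref{lem:remove-overlapping-vertices}, the bound $|E_1^a(H')| \ge |E_1^a(H)| - \tau - 2$ together with \cref{fact:sbm-upperbound-removing-cycles} yields
\[
\frac{1}{n^{|V(H')|-|V(H)|}} \cdot \frac{\E \overline{U}_H(\mathbf{x})}{\E \overline{U}_{H'}(\mathbf{x})} \le \frac{1}{n}\cdot \Paren{\frac{12}{\eps}}^{3\Delta}.
\]
Since at most $(st)^4$ multigraphs in $(\bsaw{s}{t,u}\times\bsaw{s}{t,v})_{d_{\ge 2}^H\le\Delta}$ collapse to the same $H'$ under the splitting operation (one extra $(st)^2$ over the single-walk case because we now have two edge sequences), summing over $H \in \cD_{q,u,v}$ gives
\[
\sum_{H \in \cD_{q,u,v}} \E \overline{U}_H(\mathbf x) \le \frac{(st)^4}{n^{0.99}} \sum_{H' \in \cD_{q-1,u,v}} \E \overline{U}_{H'}(\mathbf x),
\]
and iterating up to $q = 4t$ (the maximum possible $q_H$, since all vertex degrees are at most $4t$ in multigraphs from $\bsaw{s}{t,u}\times\bsaw{s}{t,v}$) produces the claimed $n^{-2/3}$ bound.

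The one place that needs extra care, and which I expect to be the main obstacle, is the splitting step at a pivot vertex, i.e.\ when $w \in \{u,v\}$. The edges in $M^u(H_{(1)})$ already contribute one unavoidable component to $F_{H,w}$ at $w = u$, and similarly for $M^v(H_{(2)})$ at $w = v$; in the degenerate case $u = v$ both contribute to $F_{H,u}$, which is exactly why the threshold in \cref{def:overlapping-vertices-multigraphs} is raised from $2$ to $3$ for $u = v = w$. To keep the decomposition into $H_{(1)}\in \bsaw{s}{t,u}$ and $H_{(2)}\in \bsaw{s}{t,v}$ intact, one must choose $E_H^1(w)$ to avoid the terminal pair $\Mo(H)$ if $w = u$ and $\Mt(H)$ if $w = v$; the thresholds in the definition of $\cD_{q,u,v}(H)$ guarantee that at least one such component exists, which is all the rest of the argument needs.
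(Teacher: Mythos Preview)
Your proposal is correct and follows essentially the same approach as the paper's proof: the vertex-splitting map, the special handling at pivot vertices (choosing a component of $F_{H,w}$ disjoint from $\Mo(H)$ or $\Mt(H)$ as appropriate), the $(st)^4$ preimage bound, and the iteration terminating because the maximum degree is at most $4t$ all match the paper exactly. Your anticipation of the pivot-vertex subtlety and its connection to the raised threshold in \cref{def:overlapping-vertices-multigraphs} is precisely the point the paper makes.
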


\paragraph{Bounding multigraphs from their shape and edges multiplicities} 
Next we extend \cref{lem:sbm-trace-main-technical-bound} to multigraphs in $\bsaw{s}{t,u}\times \bsaw{s}{t,v}$ for any $u,v \in [n]$. %To prove \cref{lem:sbm-upperbound-negligible-multigraphs}, we will use a result similar in spirit to \cref{lem:sbm-trace-main-technical-bound}. 
That is, we  compute  a general bound on multigraphs in $\bsaw{s}{t,u}\times \bsaw{s}{t,v}$ based on the shape of its underlying graph and the multiplicity of each edge. We introduce some needed definitions.

\begin{definition}
	Let $u,v\in [n]$. Let $\cB=\Set{B_1,\ldots,B_z}$ be a collections of disjoint connected graphs on at least two vertices. Let $B^{uv}$ be a connected graph on at least two vertices disjoint from any graph in $\cB$.
	We define $\cM_{s,t,u,v}(\cB, B^{uv})$ to be the subset of $\bsaw{s}{t,u}\times \bsaw{s}{t,v}$ satisfying the following. % If $H\in \cM_{s,t,i,j}(\cB^i,\cB^j,\cB^\ij)$ then $H=H^i\oplus H^j$ for some $H^i \in \bsaw{s}{t,i}$ and $H^j\in \bsaw{s}{t,j}$.
	For $H\in \cM_{s,t,u,v}(\cB,B^{uv})$, let $\Ho,\Ht$ be the decomposition block self-avoiding walks of $H$.
	For any $B\in \cB\cup\Set{B^{uv}}$:
	\begin{itemize}
		\item $B\subseteq G(H)$, we denote with $B'$ a (arbitrary) copy of $B$ in $H$ and by $H(B)$ the multigraph induced by $V(B')$ (With a slight abuse of notation we will simply write $V(B)$ for $V(B')$),
		\item $\forall e \in E(H(B))$ that is also an edge in $V$, $m_H(e)\geq 2$,
		\item there exists a cut $H(V(B'), V(H)\setminus V(B'))$ in $H$ such that each edge in the cut has multiplicity $1$ in $H$.
	\end{itemize}
	Furthermore, 
	\begin{itemize}
		%\item  for any $B,B'\in \cB\cup\Set{\cB^{uv}}$ it holds $H(B)\cap H(B')=\emptyset$,
		\item every edge in $H\Paren{V(H)\setminus \Paren{\underset{B\in \cB\cup\cB^{uv}}{\bigcup}V(B')}}$ has multiplicity $1$,
		\item it holds  that $u, v \in V(B^{uv})$ and 
		\begin{align*}
			\Mo(H)\cup \Mt(H)\subseteq H(B^{uv})\cup H(B^{uv}, V\setminus B^{uv})\,.
			%(E(H(B))\cup E(H(B^{uv}, V\setminus B^{uv})))&\cap M^u(H^u)\neq \emptyset\,,\\
		%	(E(H(B))\cup E(H(B^{uv}, V\setminus B^{uv})))&\cap M^v(H^v)\neq \emptyset\,.
		\end{align*}
	\end{itemize}
	That is, $B^{uv}$ contains  the first pivots $u,v$ of the decomposition block self-avoiding walks $\Ho\in \bsaw{s}{t,u}$ and  $\Ht\in \bsaw{s}{t,u}$ of $H$.
	If no such graph $B^{uv}$ exists we simply write $\cM_{s,t,u,v}(\cB,\emptyset)$. When the context is clear we drop the subscripts $s,t$.
\end{definition}

\begin{definition}
	\label{def:product-bsaws-shaped}
	Let $u,v\in [n]$.
	Let $\cB=\Set{B_1,\ldots,B_z}$  be  a collections of disjoint connected graphs and let $B^{uv}$ be a graph disjoint from any graph in $\cB$. Let $\Set{\ell_{i},q_i,p_i,h_i}_{i=1}^{z+1}$  be a sequence of tuples of integers such that for all $i\in [z]$, $\ell_i,q_i,p_i,h_i\geq 0$. Let $\Set{\Psi_i}_{i=1}^{z+1}$ be a sequence of positive integer. Further we denote $\mathcal{F}_i=\{\ell_i,q_i,p_i,h_i\}$. %and 
	%\begin{itemize}
		%\item for $i\leq z$ we we denote by $B_i$ the graph $B_i$,
		%\item for $z<i\leq z'$ we denote by $B_i$ the graph $B^{uv}_{i-z}$.
	%\end{itemize}
	We write $\cM_{s,t,u,v\Set{\mathcal{F}_i,\Psi_i}_{i=1}^{z+1}}(\cB, B^{uv})$ for the subset of $\cM_{s,t,u,v}(\cB,B^{uv})$ such that for any $i\in [z+1]$ %or for $i={z+1}$ and $u\neq v$:
	\begin{enumerate}[(i)]
		\item the size of the cut $H(V(H)\setminus V(B_i), V(B_i))$ is $\ell_i$,
		\item the number of edges in $H(B_i)$ of multiplicity one is $q_i$,
		\item the number of edges $e$ in $H(B_i)$ with $m_H(e)=2$ is $h_i$, 
		\item the maximum degree-$(\geq 2)$ in $H(B_i)$ is $\Psi_i$.
		\item the edges with multiplicity larger than $2$ in $H(B_i)$ satisfy
		\begin{equation*}
			\sum_{\substack{e\in H(B_i)\\ m_H(e)\geq 3}} m_H(e)= p_i\,.
		\end{equation*}
	\end{enumerate}
	%Moreover for $u=v$
	%	\begin{enumerate}[(i)]
	%	\item the size of the cut $H(V(H)\setminus V(B^{uv}), V(B^{uv}))$ is $\ell_{z+1}$,
	%	\item the number of edges in $H(B^{uv})$ of multiplicity one is $q_{z+1}$,
	%	\item the maximum degree-$(\geq 2)$ in $H(B^{uv})$ is $\Psi_{z+1}$,
	%	\item the edge with multiplicity at least $2$ in  $H(B^{uv})$ satisfy
	%	\begin{align*}
		%	h_{z+1}&=\sum_{\substack{e\in H(B^{uv})}} m_{\Ho}(e)\ind{m_{\Ho}(e)\leq 2}+m_{\Ht}(e)\ind{m_{\Ht}(e)\leq 2}\\
		%	p_{z+1} &= 	\sum_{\e\in H(B^{uv})} m_{\Ho}(e) \ind{m_{\Ho}(e)\geq 3}+m_{\Ht}(e) \ind{m_{\Ht}(e)\geq 3}\,.
	%	\end{align*}
%	\end{enumerate}
	%We remark that for for $u=v$ the meaning of the parameters $h,p$ changes. We keep the same notation for clarity and simplicity.
\end{definition}

Now we study the contribution of block self-avoiding walks in $\cM_{u,v}(\cB,B^{uv})$ for all $\cB, B^{uv}$.

\begin{lemma}\label{lem:sbm-second-moment-main-technical-bound}
Consider the settings of \cref{thm:sbm-bound-second-moment-large-t}. Let $z\geq 1$ and $m_1,\ldots,m_{z},m_{z+1}$ be nonnegative integers. 
Let $u,v\in [n]$. % let $B^{uv}$ be a graph on $m_{z+1}\geq 0$ vertices and let $\ell_{z+1},p_{z+1},q_{z+1},h_{z+1},\Psi_{z+1}\geq0$ be integers.
Then for $n$ large enough,
\begin{align*}
	\underset{\substack{\text{for }i \in [z+1]:\\q_i,\Psi_i,\ell_i,v_i\geq 0\\
			T_i\in \cT(m_i,v_i)\\
			r_i\geq m_i-1\\
			B_i\in \cG(T_i,r_i)\\
			h_i,p_i\geq 0}}{\sum}& \quad \underset{H\in \cM_{u,v,\Set{\cF_i,\Psi_i}_{i=1}^{z+1}}(\Set{B_1,\ldots,B_z}, B_{z+1})}{\sum} \E \overline{U}_H(\mathbf{x})\\
	&\leq \underset{i \in [z+1]}{\prod}\Brac{ (st)^{30}\cdot \Paren{\frac{200\Delta+s^{10}+4^{\frac{\log n}{t}}}{(1+\delta)^{s/8}}}^{4m_i/s}}% \\
	%&\qquad\cdot \Brac{ (st)^{30}\cdot \Paren{\frac{200\Delta+s^{10}+4^{\frac{\log n}{t}}}{(1+\delta)^{s/4}}}^{(2h_{z+1}+2+p_{z+1})/s}} 
	\cdot \underset{H \in \cM(\emptyset, \emptyset)}{\sum}\E \overline{U}_H(\mathbf{x})\,.
\end{align*}
Furthermore:
\begin{itemize}
	\item restricting  the sum over $r_i> m_i-1$, or over $p_i\geq 1$, or $q_i\geq 1$, or $\Psi_i >\Delta$ or $\ell_i >  2$, for some $i\in [z]$ or $i\in [z+1]$ if $u\neq v$,  the inequality holds with an additional $n^{-\frac{1}{5}\Paren{p_i+q_i+\ind{\Psi_i>\Delta}(\Psi_i-\Delta)  +\ind{\ell_i>2}\Paren{\ell_i-2} }   }$ factor. 
	\item if $u= v$, restricting the sum over $r_{z+1}> m_{z+1}-1$, or over multigraphs $H$ with $m_{\Ho}(e)\geq 3 $  or $m_{\Ht}(e)\geq 3$ for some $e\in H$, or $q_{z+1}\geq 1$, or $\Psi_{z+1} >\Delta$ or $\ell_{z+1} >4$ the inequality holds with an additional $n^{-\frac{1}{5}\Paren{p_{z+1}/5+q_{z+1}+\ind{\Psi_{z+1}>\Delta}(\Psi_{z+1}-\Delta)  +\ind{\ell_{z+1}>2}\Paren{\ell_{z+1}-2} }   }$ factor. 
\end{itemize}
\end{lemma}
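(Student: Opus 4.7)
The plan is to adapt the proof of Lemma~\ref{lem:sbm-trace-main-technical-bound} by treating the distinguished component $B^{uv}=B_{z+1}$ separately, and then reducing the remaining sum over $B_1,\ldots,B_z$ to the one-pivot case. First I would extend Fact~\ref{fact:splitting-upper-bound} and Fact~\ref{fact:sbm-upperbound-removing-cycles} so as to write, for each $H\in\cM_{u,v,\Set{\cF_i,\Psi_i}_{i=1}^{z+1}}(\cB,B^{uv})$,
\[
\E\,\overline{U}_H(\mathbf{x}) \;\leq\; C_0\cdot \E\,\overline{U}_{H^*}(\mathbf{x})\cdot \prod_{j\in[z]}\E\,\overline{U}_{H(B_j)}(\mathbf{x})\cdot \E\,\overline{U}_{H(B^{uv})}(\mathbf{x})\cdot (\text{cut factors})\,,
\]
where $H^*=H(V\setminus(B_1\cup\cdots\cup B_z\cup B^{uv}))$ and the cut factors collect the contributions $(\eps d/(2n))^{q_i+\ell_i/2}$ and the $(6/\eps)^{2\ell_i+2q_i}$ penalties. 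The iterative sum over $\{B_1,\ldots,B_z\}$ is then identical to the one in the proof of Lemma~\ref{lem:sbm-trace-main-technical-bound}; only the contribution of $B^{uv}$ requires new analysis.

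Next I would prove a two-pivot analogue of Lemma~\ref{lem:sbm-encoding-of-multigraph-with-B} bounding $|\cM_{u,v,\Set{\cF_i,\Psi_i}_{i=1}^{z+1}}(\cB,B^{uv})|$. When $u\neq v$, the two first-pivot edges $\Mo(H)$ and $\Mt(H)$ sit in $B^{uv}$ or its boundary, so the counting bound inherits the shape $n^{st}\cdot\prod_i f_{s,t}\cdot g_{s,t}$ up to constants absorbing the extra freedom in placing $u,v$; the extremal configuration is $B^{uv}$ being a tree with $\ell_{z+1}=2$, $q_{z+1}=p_{z+1}=0$, giving a contribution proportional to
\[
(st)^{30}\Paren{\frac{200\Delta+s^{10}+2^{\log n/t}}{(1+\delta)^{s/4}}}^{2m_{z+1}/s}
\]
times the baseline sum over $\cM(\emptyset,\emptyset)$. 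When $u=v$, edges in $B^{uv}$ can be traversed by both $\Ho$ and $\Ht$, so multiplicities can reach $4$ and the cut $H(V\setminus B^{uv},B^{uv})$ can carry up to $4$ multiplicity-one edges (two endpoints for each of $\Ho$ and $\Ht$). Doubling the multiplicities doubles the exponent on the ``bad'' edges, explaining why the base in the statement is raised to the $4m_i/s$ power and the geometric ratio is $(1+\delta)^{-s/8}$ instead of $(1+\delta)^{-s/4}$: each edge of $B^{uv}$ absorbs an extra factor of $d$ on the counting side, compensated by the smaller decay rate in the exponent.

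Putting the two pieces together, I would run the same geometric-sum argument as in the proof of Lemma~\ref{lem:sbm-trace-main-technical-bound}: for fixed $\{B_1,\ldots,B_z\}$ and $\{\cF_i,\Psi_i\}_{i=1}^z$, the sum over admissible $(B^{uv},\cF_{z+1},\Psi_{z+1})$ is a geometric series in $\ell_{z+1}$ (and separately in $q_{z+1},p_{z+1},r_{z+1}-m_{z+1}+1,\Psi_{z+1}-\Delta$), dominated by the extremal configuration described above; every deviation produces an extra $n^{-1/5}$ factor of the required form. This yields the ``furthermore'' sharper inequalities: the $n^{-1/5(p_i+q_i+\cdots)}$ factors for $i\in[z]$ are identical to those in Lemma~\ref{lem:sbm-trace-main-technical-bound}, while in the $u=v$ case the factor $n^{-p_{z+1}/25}$ has a weaker exponent because each unit increase in $p_{z+1}$ for $B^{uv}$ starts from a baseline multiplicity of $2$ (already accounted for by $h_{z+1}$) rather than $1$, so only the fifth power of the usual $n^{-1/5}$ decay is available. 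Iterating the argument over $j=z,z-1,\ldots,1$ as in the single-pivot proof yields the stated product bound.

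The main obstacle is the combinatorial bookkeeping for $B^{uv}$ in the $u=v$ case: correctly enumerating how the multiplicity at each edge of $B^{uv}$ is distributed between $\Ho$ and $\Ht$, how the up-to-four boundary edges of $B^{uv}$ are assigned to the two decomposition walks, and how the constraints that $u$ is the first pivot of both $\Ho$ and $\Ht$ interact with the admissible $\Psi_{z+1}$, $\ell_{z+1}$, $h_{z+1}$ parameters. Once this enumeration matches the shape $n^{st}\cdot f_{s,t}\cdot g_{s,t}$ with the doubled exponents, the rest of the proof is a direct replay of the geometric-series argument already used for Lemma~\ref{lem:sbm-trace-main-technical-bound}.
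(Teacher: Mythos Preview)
Your overall strategy is correct and matches the paper's: adapt the proof of Lemma~\ref{lem:sbm-trace-main-technical-bound}, invoke the two-pivot counting lemma (Lemma~\ref{lem:sbm-counting-product-bsaws}) and the splitting fact (Fact~\ref{fact:splitting-upper-bound-second-moment}), and run the same geometric-series argument, treating $B^{uv}$ separately.

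There is one concrete gap in your treatment of the $u\neq v$ case. You assert that the extremal configuration for $B^{uv}$ is a tree with $\ell_{z+1}=2$. But when $u\neq v$ and $B^{uv}\neq\emptyset$, the definition forces $\Mo(H)\cup\Mt(H)\subseteq H(B^{uv})\cup H(B^{uv},V\setminus B^{uv})$ with $u,v\in V(B^{uv})$; since $\Ho$ is a closed walk based at $u$ and $\Ht$ is a closed walk based at $v$, each must exit and re-enter $B^{uv}$, so either $\ell_{z+1}\geq 4$ or some edge of $B^{uv}$ is traversed with multiplicity $\geq 3$ (i.e.\ $p_{z+1}\geq 1$). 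The configuration $\ell_{z+1}=2$ simply cannot occur here. The paper handles $u\neq v$ by observing exactly this: any non-empty $B^{uv}$ already incurs one of the $n^{-1/5}$ penalties from the ``furthermore'' clause, so the dominant contribution is $B^{uv}=\emptyset$, and the rest of the proof reduces verbatim to Lemma~\ref{lem:sbm-trace-main-technical-bound}. You should reorganize the $u\neq v$ case around this observation rather than seeking an extremal non-empty $B^{uv}$.

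For $u=v$ your description is essentially right: the paper shows that the case $\ell_{z+1}<4$ forces $2h_{z+1}+p_{z+1}\geq st+1$ (both walks sit entirely inside $B^{uv}$), which is handled by the $\ell_{z+1}=0$ branch of the geometric sum, while $\ell_{z+1}=4$ is the genuine extremal case and forces $m_{\Ho}(e),m_{\Ht}(e)\leq 2$ for every edge. Your heuristic for the weaker $n^{-p_{z+1}/25}$ penalty is in the right spirit but imprecise; in the paper it comes out of the counting bound in Lemma~\ref{lem:sbm-counting-product-bsaws} combined with the observation that edges of $B^{uv}$ with $m_{\Ho}(e)\geq 3$ or $m_{\Ht}(e)\geq 3$ are the ones contributing to the restricted sum, and each such edge accounts for several units of $p_{z+1}$.
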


To prove \cref{lem:sbm-second-moment-main-technical-bound} we need two intermediate steps. First, an adaptation of \cref{lem:sbm-encoding-of-multigraph-with-B} to the sets in \cref{def:product-bsaws-shaped}. Second, a result along the lines of \cref{fact:splitting-upper-bound}.

\begin{lemma}
	\label{lem:sbm-counting-product-bsaws}
	Consider the settings of \cref{thm:sbm-bound-second-moment-large-t}. Let $u,v \in [n]$.
	Let $\cB=\Set{B_1,\ldots,B_z}$ be collections of disjoint connected graphs each with respectively $m_1,\ldots,m_{z}\geq 2$ vertices. Let $B^{uv}$ be a connected graph disjoint from any graph in $\cB$ and with $m_{z+1}\geq 2$ vertices.%Let $\Set{\ell_i,q_i,p_i,h_i}_{i=1^z}$ be a sequence of tuples of integers such that for all $i\in[z]$ $\ell_i,q_i,p_i,h_i\geq 0$. , and let $H\in\cM_{s,t,\Set{\mathcal{T}_i,\Psi_i}_{i=1}^{z}}(\cB)$.
	Let $\Set{\cF_k}_{i=1}^{z+1}$ be a sequence of tuples of integers  as in \cref{def:product-bsaws-shaped}.
	Let $f^*_{s,t},g^*_{s,t}$ be the functions %\Tnote{To update change $p$}
	\begin{align*}
		f^*_{s,t}(m, m', \mathcal{F},\Psi) =& \Paren{\Psi}^{2h/s+10(q+\ell+p+1)+2h-2(m'-1)}\cdot (st)^{5\ell+5q+8p+4h+4-4(m'-1)}\,,\\
		g^*_{s,t}(m', \mathcal{F}) =& n^{-p-\ell/2-q-2h+m'}\,.
	\end{align*}
	Let $m=\underset{j \in [z+1]}{\sum}m_j$. 
	Then there are at most 
	\begin{align*}
		2n^{2st-2+\ind{u=v}\ind{B_{uv}\neq\emptyset}}\cdot &\underset{1\leq k\leq z'}{\prod}f^*_{s,t}(m,m_k,\mathcal{F}_i,\Psi_i)\cdot g^*_{s,t}(m_i,\mathcal{F}_i,h_i)
	\end{align*} 
	block self-avoiding walk pairs in the set $\cM_{u,v\Set{\cF_i,\Psi_i}_{i=1}^{z+z'}}(\cB)$.
\end{lemma}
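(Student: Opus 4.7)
The plan is to reduce the count to two applications of Lemma~\ref{lem:sbm-encoding-of-multigraph-with-B}, one for each decomposition walk $\Ho \in \bsaw{s}{t,u}$ and $\Ht \in \bsaw{s}{t,v}$ of a multigraph $H \in \cM_{u,v,\Set{\cF_i,\Psi_i}_{i=1}^{z+1}}(\cB, B^{uv})$. The structural parameters $\Set{\cF_i,\Psi_i}$ track how the combined multigraph interacts with each $B_i$ and with $B^{uv}$, and we will distribute these parameters across the two walks, apply the single-walk counting lemma to each, and multiply the resulting bounds.

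More precisely, I would first fix a splitting of each tuple $\cF_i = \Set{\ell_i, q_i, h_i, p_i\}$ into contributions $\cF_i^{(1)}$ and $\cF_i^{(2)}$ from the two walks. Given such a split, each walk $H_{(k)}$ individually belongs to the single-walk set $\cM_{s,t,\Set{\cF_i^{(k)},\Psi_i}_{i=1}^{z+1}}(\cB \cup \Set{B^{uv}})$ with appropriate compatibility relations (for example, a multiplicity-$2$ edge in $H(B_i)$ may arise either from both walks traversing it once each, or from a single walk traversing it twice). Applying Lemma~\ref{lem:sbm-encoding-of-multigraph-with-B} to each walk yields a count bounded by $n^{st-1} \prod_{i=1}^{z+1} f_{s,t}(m, m_i, \cF_i^{(k)}, \Psi_i) \cdot g_{s,t}(m_i, \cF_i^{(k)})$, where the $-1$ in the exponent reflects that the first pivot is fixed at $u$ (respectively $v$). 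Taking the product over $k \in \Set{1,2}$ and summing over the $(st)^{O(1)}$ ways to split each $\cF_i$ gives the desired bound $n^{2st-2} \prod_{i=1}^{z+1} f^*_{s,t}(m, m_i, \cF_i, \Psi_i) \cdot g^*_{s,t}(m_i, \cF_i)$, with the polynomial overhead from the sum absorbed into the slack of $f^*, g^*$ relative to $f, g$.

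Finally, I would handle the case $u = v$ to justify the $\ind{u=v}\ind{B^{uv} \neq \emptyset}$ correction. When $u = v$ and $B^{uv} = \emptyset$, the shared first pivot imposes no extra structure and the count is $n^{2st-2}$ as above. When $u = v$ and $B^{uv} \neq \emptyset$, the shared pivot sits inside a larger subgraph $B^{uv}$ whose presence allows one extra free vertex label compatible with both walks' local traversal patterns around $u$, giving the extra $n$ factor. This special case requires verifying that the local configuration of $\Mo, \Mt$ around the shared pivot indeed admits this additional degree of freedom and no more.

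\textbf{Main obstacle.} The principal technical difficulty lies in the bookkeeping required to consistently split each $\cF_i$ into contributions $\cF_i^{(1)}, \cF_i^{(2)}$ from each walk. Edges of multiplicity $\geq 2$, cut-edges tracked by $\ell_i$, and heavy edges tracked by $p_i$ each admit several ways of being attributed to the two walks, and we must verify that the resulting sum over splittings covers every pair in $\cM_{u,v,\Set{\cF_i,\Psi_i}}(\cB, B^{uv})$ without serious overcounting. A further subtlety is the $u = v$ case, where the degenerate structure of $B^{uv}$ (in particular when it has only two vertices) must be analyzed separately to confirm that the factor of $n^{\ind{u=v}\ind{B^{uv} \neq \emptyset}}$ is tight and does not absorb additional hidden factors.
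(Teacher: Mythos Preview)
Your decomposition into two independent applications of Lemma~\ref{lem:sbm-encoding-of-multigraph-with-B} has a real gap. The copies $B_1',\ldots,B_{z+1}'$ inside $H$ are \emph{shared} between $\Ho$ and $\Ht$: the $m=\sum_i m_i$ vertex labels are chosen once, not once per walk. If you apply the single-walk lemma to $\Ho$ and to $\Ht$ separately and multiply, each application contributes a factor $n^{m_i}$ through $g_{s,t}(m_i,\cF_i^{(k)})$, so the product carries $n^{2m}$ where the target bound has only $n^{m}$. This is not ``polynomial overhead'' absorbable into slack: $f^*_{s,t}$ and $g^*_{s,t}$ are defined identically to $f_{s,t}$ and $g_{s,t}$, so there is no slack, and the discrepancy $n^{m}$ can be as large as $n^{\Theta(st)}$.

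There is a second, related issue with the parameter splitting itself. A multiplicity-$2$ edge of $H(B_i)$ may arise from one traversal by $\Ho$ and one by $\Ht$; in each decomposition walk that edge then has multiplicity $1$, so $B_i$ need not sit inside the multiplicity-$\geq 2$ subgraph of either walk individually. Hence $\Ho$ and $\Ht$ are not generally elements of any $\cM_{s,t,\{\cF_i^{(k)},\Psi_i\}}(\cB)$, and the single-walk lemma does not apply to them.

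The paper instead treats the pair as one object: it encodes $H$ as a single sequence $v_0=u,v_1,\ldots,v_{st}=u,\ldots,v_{2st}=v$ of length $2st$, and reruns the construction argument of Lemma~\ref{lem:sbm-encoding-of-multigraph-with-B} directly on this longer sequence (choosing the $m$ shared vertices once, then indices of cut and multiplicity-$1$ edges, then the remaining vertices step by step). The degree-sum estimate is extended to pairs in a companion lemma (\Cref{lem:tree_high_degree_sum_pair}), and the $n^{-2+\ind{u=v}\ind{B^{uv}\neq\emptyset}}$ correction emerges from a case analysis of how many returning cut-edges the fixed endpoints $u,v$ force.
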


We show \cref{lem:sbm-counting-product-bsaws} in \cref{sec:counting-bsaw}. We also extend \cref{fact:splitting-upper-bound} to mutligraphs in $\bsaw{s}{t,u}\times \bsaw{s}{t,v}$. 

\begin{fact}\label{fact:splitting-upper-bound-second-moment}
	Consider the settings of \cref{thm:sbm-bound-second-moment-large-t}.
	Let $u,v\in [n]$ , let $\cB=\Set{B_1,\ldots,B_z}$ be a collection of disjoint connected graphs on at least $2$ vertices and 
	let $B^{uv}$ be a (possibly empty) graph disjoint from any graph in $\cB$.
	Then  for any $H \in \cM_{\Set{\cF_i,\Psi_i}_{i=1}^{z+1}}(\cB, B^{uv})$ and $i \in [z+1]$
	\begin{align*}
		\E \overline{U}_H(\mathbf{x}) &\leq   \frac{1}{4}n^{-1/25A}\Paren{\frac{6}{\eps}}^{2\ell_i+2q_i}\E \overline{U}_{H\Paren{V,V\setminus B_i}}(\mathbf{x})\cdot \E \overline{U}_{H\Paren{V\setminus B_i}}(\mathbf{x})\cdot \E\overline{U}_{H(B_i)}(\mathbf{x})\,,\\
		\E \overline{U}_H(\mathbf{x}) &\geq  \frac{1}{4}n^{-1/25A}\E \overline{U}_{H\Paren{V,V\setminus B_i}}(\mathbf{x})\cdot \E \overline{U}_{H\Paren{V\setminus B_i}}(\mathbf{x})\cdot \E\overline{U}_{H(B_i)}(\mathbf{x})\,.
	\end{align*}
\end{fact}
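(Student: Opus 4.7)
The plan is to mimic the proof of \cref{fact:splitting-upper-bound} essentially verbatim, with the observation that the multigraph structure that enables the factorization is the same in both settings: since $H\in\cM_{\Set{\cF_i,\Psi_i}_{i=1}^{z+1}}(\cB,B^{uv})$, the set $V(B_i)$ is separated from $V(H)\setminus V(B_i)$ by a cut $H(V(H)\setminus V(B_i),V(B_i))$ whose every edge has multiplicity one in $H$. The fact that we now work with a pair of block self-avoiding walks (and allow the extra distinguished graph $B^{uv}$ containing $u$ and $v$) plays no role for the factorization argument: it only changes the combinatorial constraints satisfied by $H$, not the local structure around $V(B_i)$.

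Concretely, I would proceed as follows. First, recall the definition
\[
\overline{U}_H(x)
= 2n^{\frac{1}{50A}}\,\frac{1}{\prod_{v\in\cL_{\geq 2}(H)} n^{\frac{1}{4}(d^H_{\geq 2}(v)-\Delta)}}\Paren{\frac{\eps d}{2n}}^{|E_1(H)|}\Paren{\frac{2d}{n}}^{|E^b_{\geq 2}(H)|}\prod_{v\in V(H)}\Paren{\frac{6}{\eps}}^{\max\{2d_1^H(v)-\tau,0\}}\prod_{uv\in E^a_{\geq 2}(H)}\Brac{\Paren{1+\tfrac{\eps x_u x_v}{2}}\tfrac{d}{n}+\tfrac{3d^2}{n\sqrt n}},
\]
and partition each of its six factors according to whether the underlying edge or vertex lies in $V(B_i)$, in $V(H)\setminus V(B_i)$, or in the cut. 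The key observations are: (i) every edge of $H(V(H)\setminus V(B_i),V(B_i))$ has multiplicity one and contributes a single factor $\frac{\eps d}{2n}$; (ii) an edge $uv\in E^a_{\geq 2}$ cannot cross the cut, since cut edges have multiplicity one, so the last product factors exactly along $H(B_i)$ and $H(V\setminus V(B_i))$; (iii) for vertices $v\in V(B_i)$, the degree $d^H_{\geq 2}(v)$ equals $d^{H(B_i)}_{\geq 2}(v)$, so the large-vertex penalty factors exactly. The only non-trivial piece is the vertex product $\prod_v(6/\eps)^{\max\{2d_1^H(v)-\tau,0\}}$, since $d_1^H(v)$ for a vertex $v$ on the boundary of $B_i$ receives contributions from both sides of the cut.

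For the upper bound, I use the elementary inequality
\[
\max\{2(a+b)-\tau,0\}\le \max\{2a-\tau,0\}+\max\{2b-\tau,0\}+2\min\{a,b\},
\]
applied to $a=d_1^{H(B_i)}(v)$ and $b=d_1^{H(V\setminus V(B_i))}(v)+|\{\text{cut edges at }v\}|$, together with the trivial bound $2\min\{a,b\}\le 2(q_i+\ell_i)$ summed over boundary vertices (since $q_i$ counts multiplicity-one edges inside $H(B_i)$ and $\ell_i$ counts cut edges). This introduces exactly the announced $(6/\eps)^{2\ell_i+2q_i}$ slack. The constant $\tfrac14 n^{-1/25A}$ comes from the repeated use of the leading $2n^{1/50A}$ factor (we use it three times on the right and only once on the left, and the leading $2$'s combine to $1/4$), and matches the normalization already appearing in \cref{fact:sbm-upperbound-removing-cycles}. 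The lower bound is simpler: one uses $\max\{2(a+b)-\tau,0\}\ge\max\{2a-\tau,0\}+\max\{2b-\tau,0\}-\tau$ (which produces at worst a harmless constant factor absorbed in the $\tfrac14 n^{-1/25A}$ prefactor) and otherwise follows the same factorization.

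The main obstacle, as in \cref{fact:splitting-upper-bound}, is bookkeeping at the boundary vertices between $B_i$ and the rest of $H$: one has to ensure that the $1$-degree penalty and the $(\geq 2)$-large penalty are not double-counted nor dropped, and to track how the $n^{1/50A}$ prefactor propagates through the three sub-expectations. Once these are handled cleanly via the inequalities above, the factorization is automatic because every remaining factor of $\overline{U}_H$ depends only on edges or vertices lying entirely inside one of the three pieces $H(B_i)$, $H(V,V\setminus B_i)$, or $H(V\setminus B_i)$. The argument is insensitive to the presence of the extra distinguished graph $B^{uv}$ and to whether $u=v$ or $u\neq v$, since neither affects the locality of the factor decomposition at $V(B_i)$.
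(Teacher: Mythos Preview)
Your proposal is correct and takes essentially the same approach as the paper: the paper's proof is a terse two-sentence reduction to \cref{fact:splitting-upper-bound}, noting that the argument is unchanged because the cut around $H(B_i)$ still consists only of multiplicity-one edges and there are at most $\ell_i+q_i$ edges in $E_1^a(H)$ incident to $H(B_i)$. Your write-up is considerably more detailed than the paper's (which just says ``the proof is similar'' and cites \cref{def:upper-bound-UH}), but the strategy and key observations are identical, including your remark that the presence of $B^{uv}$ plays no role in the factorization.
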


We obtain \cref{fact:splitting-upper-bound} in \cref{sec:splitting-expectations} and directly apply it here. Next we  prove \cref{lem:sbm-second-moment-main-technical-bound}.

\begin{proof}[Proof of \cref{lem:sbm-second-moment-main-technical-bound}]
	%\Tnote{\todo Proof to improve}
	Our argument closely resembles that of \cref{lem:sbm-trace-main-technical-bound}.
	Consider first the case $u\neq v$.  For any non-empty $B_{z+1}$ the same proof as in \cref{lem:sbm-trace-main-technical-bound}, combined with  \cref{lem:sbm-counting-product-bsaws} implies 
	\begin{align*}
		\underset{\substack{\text{for }i \in [z]:\\q_i,\Psi_i,\ell_i,v_i\geq 0\\
				T_i\in \cT(m_i,v_i)\\
				r_i\geq m_i-1\\
				B_i\in \cG(T_i,r_i)\\
				h_i,p_i\geq 0}}{\sum}& \quad \underset{\substack{H\in \cM_{u,v,\Set{\cF_i,\Psi_i}_{i=1}^z} (\Set{B_1,\ldots,B_z}, B_{z+1})}}{\sum} \E \overline{U}_H(\mathbf{x})\\
		&\leq \Paren{n^{-1/5}+(1+\delta)^{-\sqrt{s} t}}
		%\underset{i \in [z]}{\prod}\Brac{ (st)^{14}\cdot \Paren{\frac{200\Delta+s^{10}+2^{\frac{\log n}{t}}}{(1+\delta)^{s/4}}}^{2m_i/s}}
		 \cdot\underset{\substack{\text{for }i \in [z]:\\q_i,\Psi_i,\ell_i,v_i\geq 0\\
		 		T_i\in \cT(m_i,v_i)\\
		 		r_i\geq m_i-1\\
		 		B_i\in \cG(T_i,r_i)\\
		 		h_i,p_i\geq 0}}{\sum}& \quad \underset{\substack{H\in \cM_{u,v,\Set{\cF_i,\Psi_i}_{i=1}^z} (\Set{B_1,\ldots,B_z}, \emptyset)}}{\sum}\E \overline{U}_H(\mathbf{x})\,,
	\end{align*}
	as it must be that either $\ell_{z+1}\geq 4$ or $p_{z+1}\geq 1$. Here we used the definition of $s,\Delta,t$.
	The rest of the proof then continues as in \cref{lem:sbm-trace-main-technical-bound} so we omit it.
	Conversely, consider the case $u=v$.
	Again as in \cref{lem:sbm-trace-main-technical-bound}, applying \cref{lem:sbm-counting-product-bsaws} we get for any $B_{z+1}$
	\begin{align*}
		\underset{\substack{\text{for }i \in [z]:\\q_i,\Psi_i,\ell_i,v_i\geq 0\\
				T_i\in \cT(m_i,v_i)\\
				r_i\geq m_i-1\\
				B_i\in \cG(T_i,r_i)\\
				h_i,p_i\geq 0}}{\sum}& \quad \underset{\substack{H\in \cM_{u,v,\Set{\cF_i,\Psi_i}_{i=1}^z} (\Set{B_1,\ldots,B_z}, B_{z+1})}}{\sum} \E \overline{U}_H(\mathbf{x})\\
		&\leq \underset{i \in [z]}{\prod}\Brac{ (st)^{30} \Paren{\frac{200\Delta+s^{10}+2^{\frac{\log n}{t}}}{(1+\delta)^{s/8}}}^{4m_i/s}}\\
		&\quad \cdot
		\underset{\substack{q_{z+1},\Psi_{z+1},\ell_{z+1},v_{z+1}\geq 0\\
				T_{z+1}i\in \cT(m_{z+1}i,v_{z+1})\\
				r_{z+1}\geq m_{z+1}-1\\
				B_{z+1}i\in \cG(T_{z+1}i,r_{z+1})\\
				h_{z+1}i,p_{z+1}i\geq 0}}{\sum}\quad \underset{\substack{H\in \cM_{u,v,\Set{\cF_{z+1},\Psi_{z+1}}} {(\emptyset, B_{z+1})}}}{\sum}\E \overline{U}_H(\mathbf{x})\,,
	\end{align*}
	where if we restrict the sum over $r_i>m_i-1$ or $p_i\geq1$ or $q_i\geq 1$ or $\Psi_i>\Delta$ or $\ell_i >2$ for some $i \in [z]$ the inequality holds with an additional $n^{-1/5}$ factor. 
	For simplicity of the notation let $i=z+1$. It remains to study
	\begin{align*}
		\underset{\substack{q_i,\Psi_i,\ell_i,v_i\geq 0\\
				T_i\in \cT(m_i,v_i)\\
				r_i\geq m_i-1\\
				B_i\in \cG(T_i,r_i)\\
				h_i,p_i\geq 0}}{\sum}& \quad \underset{H\in \cM_{u,v,\Set{\cF_i,\Psi_i}}(\emptyset, B_{i})}{\sum} \E \overline{U}_H(\mathbf{x})\,.
	\end{align*}
	For any $\Set{\cF_i,\Psi_i}, B_i, r_i, v_i$ and any $H\in \cM_{u,u,\Set{\cF_i,\Psi_i}}(\emptyset, B_{i})$ by \cref{fact:splitting-upper-bound-second-moment} and \cref{fact:sbm-upperbound-removing-cycles}
	\begin{align}
		\E \overline{U}_H(\mathbf{x}) &\leq   \Paren{\frac{6}{\eps}}^{2\ell_i+2q_i}\cdot (2n^{1/50A})^{-2} \E \overline{U}_{H(B_i)}(\mathbf{x})\cdot \E \overline{U}_{H(V\setminus B_i)}(\mathbf{x})\cdot \E \overline{U}_{H(V(H\setminus B_i), B_i)}(\mathbf{x})\nonumber\\
		%\,.\label{eq:split-expectation-MB-second-moment}  \Paren{2m_i}^{2(r_i-m_i-1)}\cdot 
		&\leq 2n^{1/50A}\cdot \Brac{\underset{u \in V(H\setminus B_i)}{\prod}\Paren{\frac{6}{\eps}}^{\max \Set{2d_1^{H(V\setminus B_i)}-\tau,0}}}\cdot \Paren{\frac{6}{\eps}}^{2\ell_i+2q_i}\cdot \Paren{\frac{\eps d}{2n}}^{\Card{E(H(V\setminus B_i))}}\cdot\Paren{\frac{\eps d}{2n}}^{q_i+\ell_i}\nonumber
		\\
		&\cdot \Brac{\Paren{1+\frac{1}{\sqrt{n}}\frac{d}{n}}}^{r_i}\cdot2^{2(r_i-m_i-1)\cdot \Psi_i}\cdot \underset{u\in \cL_{\geq 2}(H(B_i))}{\prod}\Brac{\Paren{\frac{2d}{n}}^{\frac{1}{4}\left(d^H_{\geq 2}(u)-\Delta\right)}}\,.\label{eq:fixed-parameters-bounding-split-expectation-second-moment}
	\end{align}	
	Thus we can use the bound
	\begin{align}
		&\underset{\substack{q_i,\Psi_i,\ell_i,v_i\geq 0\\
				T_i\in \cT(m_i,v_i)\\
				r_i\geq m_i-1\\
				B_i\in \cG(T_i,r_i)\\
				h_i,p_i\geq 0}}{\sum} \quad \underset{H\in \cM_{u,v,\Set{\cF_i,\Psi_i}}(\emptyset, B_{z+1})}{\sum} \E \overline{U}_H(\mathbf{x}) \nonumber\\
		&\quad\leq  2n^{1/50A}
		\underset{\substack{q_i,\Psi_i,\ell_i,v_i\geq 0\\
				T_i\in \cT(m_i,v_i)\\
				r_i\geq m_i-1\\
				B_i\in \cG(T_i,r_i)\\
				h_i,p_i\geq 0}}{\sum} \Brac{\underset{u \in V(H\setminus B_i)}{\prod}\Paren{\frac{6}{\eps}}^{\max \Set{2d_1^{H(V\setminus B_i)}-\tau,0}}}\cdot \Paren{\frac{6}{\eps}}^{2\ell_i+2q_i}\cdot \Paren{\frac{\eps d}{2n}}^{\Card{E(H(V\setminus B_i))}}\cdot\Paren{\frac{\eps d}{2n}}^{q_i+\ell_i} \nonumber
		\\
		&\quad \cdot \Brac{\Paren{1+\frac{1}{\sqrt{n}}\frac{d}{n}}}^{r_i}\cdot2^{2(r_i-m_i-1)\cdot \Psi_i}\cdot \underset{u\in \cL_{\geq 2}(H(B_i))}{\prod}\Brac{\Paren{\frac{2d}{n}}^{\frac{1}{4}\left(d^H_{\geq 2}(u)-\Delta\right)}}\,.\label{eq:sum-many-B-second-moment}%\Paren{\frac{6}{\eps}}^{2\ell_i+2q_i}\cdot \E \overline{U}_{H(B_i)}(\mathbf{x})\cdot \E \overline{U}_{H(V\setminus B_i)}(\mathbf{x})\cdot \E \overline{U}_{H(V(H\setminus B_i), B_i)}(\mathbf{x})\,.	
	\end{align}
	By \cref{lem:sbm-counting-product-bsaws},  since $\Psi_i\leq 4t$ it follows that the contribution to  \cref{eq:sum-many-B-second-moment} of multigraphs walks with $\Psi_i\geq \Delta$ will be at least a factor $n^{\frac{1}{5}}$ smaller than the others. 
	Thus by \cref{lem:remove-overlapping-vertices-multigraphs}, \cref{lem:sbm-counting-product-bsaws} and \cref{fact:bound-unlabeled-trees} we may upper bound \cref{eq:sum-many-B-second-moment} by 
	\begin{align}
		2n^{1/50A}\cdot 2n^{2st-2}\cdot &\underset{\substack{p_i,h_i,\ell_i\geq 0\\ r_i\geq m_i-1\\ \Psi_i\leq \Delta\\ v_i\leq 2(2h_i+p_i+q_i)/s+\ell_i}}{\sum} n^{\ind{m_i\geq 2}}\cdot f_{s,t}(m_i,m_i,\cF_i,\Psi_i)\cdot g_{s,t}(m_i,\cF_i)\nonumber \\
		\cdot & \Paren{8e\cdot m_i/v_i}^{2v_i} \cdot \Paren{2m_i}^{2(r_i-m_i-1)}\cdot 2^{2(r-m_i-1)\cdot \Psi_i}\nonumber \\
		\cdot &%\Brac{\underset{u \in V(H)}{\prod}\Paren{\frac{6}{\eps}}^{\max \Set{2d_1^{H}(u)-\tau,0}}}\cdot 
		\Paren{\frac{6}{\eps}}^{2\ell_i+2q_i}\cdot \Paren{\frac{\eps d}{2n}}^{2st-2h_i-p_i-\ell_i-q_i}\cdot\Paren{\frac{\eps d}{2n}}^{q_i+\ell_i} \nonumber\\
		\cdot &  \Brac{\Paren{1+\frac{1}{\sqrt{n}}}\frac{d}{n}}^{r_i}\,.\label{eq:sum-over-parameters-second-moment}
	\end{align}
	As for \cref{lem:sbm-trace-main-technical-bound},  it is easy to see that  \cref{eq:sum-over-parameters-second-moment} is a geometric sum which can be upper bounded by
	\begin{align}
		2n^{1/50A}\cdot 2\cdot \Paren{1+\frac{4}{n^{1/5}}}\cdot n^{2st-2}\cdot &\underset{\substack{h_i,p_i, \geq 0, 0\leq \ell_i\leq 4\\ \Psi_i\leq \Delta\\ v_i\leq 2(2h_i)/s+\ell_i}}{\sum} n^{\ind{m_i\geq 2}}\nonumber \\
		& \cdot f_{s,t}(m_i, m_i, \Set{\ell_i,0,p_i,m_i-1},\Psi_i)\cdot g_{s,t}( m_i,\Set{\ell_i,0,p_i,m_i-1})\nonumber \\
		\cdot & \Paren{8e\cdot m_i/v_i}^{2v_i} \nonumber \\
		\cdot &%\Brac{\underset{u \in V(H)}{\prod}\Paren{\frac{6}{\eps}}^{\max \Set{2d_1^{H}(u)-\tau,0}}}\cdot 
		 \Paren{\frac{\eps d}{2n}}^{2st-2h_i-p_i} \nonumber\\
		\cdot &  \Brac{\Paren{1+\frac{1}{\sqrt{n}}}\frac{d}{n}}^{m_i-1}\,.\label{eq:convenient-upper-bound-sum}
	\end{align}
	That is,  if we restrict \cref{eq:sum-over-parameters-second-moment} to $r_i> m_i-1$, or $q_i>0$ or $\Psi_i>\Delta$ or $\ell_i\geq 5$ the contribution drops by a $n^{-1/5}$ factor. %Similarly for 	by \cref{lem:sbm-counting-product-bsaws} the contribution drops by at least a  $n^{-\frac{\ell_i-4}{5}}$ factor.
	So we need only to consider the settings $q_i=0, r_i=m_i-1, \Psi_i\le\Delta, \ell_i\leq 4$.
	If $\ell_i< 4$ then since each multigraph considered is a product of two block self-avoiding walks it must be that $2h_i+p_i \geq st+1$ and thus we obtain a ratio with 
	\begin{align}\label{eq:techincal-bound-multigraph-ratio-benchmark-uu}
		\underset{H\in \cM_{u,u}(\emptyset, \emptyset)}{\sum} \E \overline{U}_H(\mathbf{x})
	\end{align}
	of at most 
	\begin{align*}
		\Paren{\frac{\eps^2}{4d}}^{st/2}\cdot \Paren{200\Delta+s^{10}+C}^{2t} = \Paren{\frac{200\Delta+s^{10}+C^2}{(1+\delta)^{s/4}}}^{2t}\,,
	\end{align*}
	where $C=2^{2\frac{\log n}{t}}$.
	It remains to consider the case $\ell_i=4$. Notice that for any  multigraph $H$ in the sum, this means no edge can have multiplicity larger than $4$ and no edge can have multiplicity larger than $2$ in $\Ho$ and $\Ht$.
	Thus we obtain a ratio with \cref{eq:techincal-bound-multigraph-ratio-benchmark-uu} of at most
	\begin{align*}
		(st)^{14}\Paren{\frac{200\Delta^{10}}{(1+\delta)^{s/4}}}^{m_i/s}\,.%{(2h_i+p_i)/s}\,.
	\end{align*}
	Putting things together the proof follows.
\end{proof}

\paragraph{Putting things together} We are now ready to prove \cref{lem:sbm-upperbound-negligible-multigraphs}. 

\begin{proof}[Proof of \cref{lem:sbm-upperbound-negligible-multigraphs}]
	We argue that if a multigraph $H\in \Paren{\bsaw{s}{t,u}\times \bsaw{s}{t,v}}_{d_{\geq 2}^H\leq \Delta}$ is not in $\nmultig{s}{t,u,v}$, then it satisfies one of the following:
	\begin{itemize}
		\item $H\in \cD_{\geq 1,u,v}$.
		\item if $u\neq v$ either $E_1(\Ho)=\emptyset$ or $E_1(\Ht)=\emptyset$.
		\item if $u=v$ and $E_1(H)=0$.
		\item $H\in \cM_{u,v\Set{\cF_i,\Psi_i}_{i=1}^{z+1}}(\cB, B^{uv})$ for any non-empty $\cB\cup \Set{B^{uv}}$ containing some $B_i$ with a cycle, for $i\in [z+1]$.
		\item  $H\in \cM_{u,v\Set{\cF_i,\Psi_i}_{i=1}^{z+1}}(\cB, B^{uv})$  for tuples $\Set{\cF_i}_{i=1}^{z}$ such that for some $i\in [z]$, $p_i\geq 1$ or $q_i\geq 1$ or $\Psi_i>\Delta$  or $\ell_i\geq 3$.
		\item if $u\neq v$, $H\in \cM_{u,v\Set{\cF_i,\Psi_i}_{i=1}^{z+1}}(\cB, B^{uv})$ for any non-empty $\Set{B^{uv}}$.
		\item if $u=v$ and $H\in \cM_{u,v\Set{\cF_i,\Psi_i}_{i=1}^{z+1}}(\cB, B^{uv})$ for any non-empty $\Set{B^{uv}}$, then $q_{z+1}\geq 1$ or $\ell_{z+1}\neq 4$ or there exists $e\in H(B^{uv})$ such that $m_{\Ho}(e)\geq 3$ or $m_{\Ht}(e)\geq 3$.
	\end{itemize}
	Suppose this claim holds. 
	As for \cref{lem:sbm-upperbound-negligible-walks},  the inequality in \cref{lem:sbm-upperbound-negligible-multigraphs} immediately follows if
	\begin{align}
		\label{eq:sbm-s-large-enough-multigraphs}
		2\Paren{\frac{300\Delta+s^{10}+2^{\frac{\log n}{t}}}{(1+\delta)^{s/4}}}\leq \Paren{1+\delta}^{-\sqrt{s}}\,,
	\end{align} 
	which, for $t\in\Brac{\frac{\log n}{400}\,,\frac{\log n}{100}}$, may be rewritten as 
	\begin{align*}
		s\geq 
		\frac{10^8}{\delta}\Paren{\max \Set{\log 300\Delta,\log s,  1}}^2\,.
	\end{align*}
	Since by assumption \cref{eq:sbm-s-large-enough-multigraphs} is satisfied, applying \cref{lem:sbm-second-moment-main-technical-bound}, \cref{lem:remove-overlapping-vertices-multigraphs} and observing that the elements in $\underset{H \in\nbsaw{s}{t}^c}{\sum} \E \overline{U}_H(\mathbf{x})$ form a geometric sum we obtain the desired inequality.
	
	%By \cref{lem:sbm-second-moment-main-technical-bound}, \cref{lem:remove-overlapping-vertices-multigraphs}
	%we only need to 
	%\cref{lem:sbm-second-moment-main-technical-bound} we can bound the contribution of multigraphs in $\cM_{uv,\Set{\cF_i,\Psi_i}_{i=1}^{z+1}}(\cB, B^{uu} )$ with some $i\in [z+1]$ with $m_i+p_i/2\geq st/2+1$ if

	It remains to verify our claim. 
	Suppose first $u\neq v$ and consider some $H$ in $\nmultig{s}{t,u,v}^c$. If $\Vo\cap \Vt=\emptyset$ the claim follows as in  \cref{lem:sbm-upperbound-negligible-walks} and the fact that $\Ho,\Ht$ are nice block self-avoiding walks. Otherwise either 
	$E(\Ho)\cap E(\Ht)\neq \emptyset$ or $E(\Ho)\cap E(\Ht)= \emptyset$. In the former case $H\in \cM_{u,v}(\cB, B^{uv})\cup \cM_{uv}(\emptyset, B^{uv})\cup \cM_{uv}(\cB, \emptyset)$ for some $\cB\cup B^{uv}$ with $\Card{(V\setminus B, V)}\geq 4$. In the latter $H\in \cD_{\geq 1,u,v}$. 
	
	So let $u=v$, if $\Vo\cap \Vt=\Set{u}$ then again the claim follows as in  \cref{lem:sbm-upperbound-negligible-walks}. If $\Set{u}\subset \Vo\cap \Vt$ then
	$H\in \cM_{u,u\Set{\cF_i,\Psi_i}_{i=1}^{z+1}}(\cB, B^{uu})\cup \cM_{u,u\Set{\cF_i,\Psi_i}}(\emptyset, B^{uu})$ for some tuples of parameters $\Set{\cF_i,\Psi_i}_{i=1}^{z+1}$ and non-empty $B^{uu}$.
	We may assume that $\cB$ is a collection of trees such that for each $B_i\in \cB$, $\ell_i=2,p_i=q_i=0$ and $\Psi_i\leq \Delta$ as otherwise the claim holds.
	Similarly we may assume $B^{uu}$ is a tree with $\ell_{z+1}= 4, q_{z+1}=0$ and $\Psi_{z+1}\leq \Delta$.
	If $H(B^{uu})$ is connected to $E_1(H)$ by more than two vertices then $\ell_{z+1}\geq 5$ or $q_{z+1}\geq 1$  or there exists $e\in E(B^{uu})$ such that $\max\Set{m_{\Ho}(e)m m_{\Ht}}\geq 3$ contradicting our assumption. The remaining cases are trivial.
\end{proof}

\subsubsection{Bounding the variance of non-negligible multigraphs}\label{sec:bounding-variance-multigraphs}
By \cref{lem:sbm-upperbound-negligible-multigraphs}, to obtain \cref{thm:sbm-bound-second-moment-large-t} it remains to bound the variance of nice multigraphs. As for nice block self-avoiding walks, we split nice multigraphs in different sets.

\begin{definition}
For $u,v\in [n]$, define the set $\nmultig{s}{t,u,v,m,z}$ to be the set of nice multigraphs in which $E_{\geq 2}(H)$ is a forest on $m$ vertices and $z$ components. Further define 
\begin{equation*}
	\begin{split}
		\nmultig{s}{t,u,v}^* :=& \left\{ H  \in \nmultig{s}{t,u,v}\suchthat \Ho \in \nbsaw{s}{t,m_1,z_1} \right. \\
		&\left. \qquad \Ht\in \nbsaw{s}{t,m_2,z_2}\right. \\
		&\left. \qquad	st-2m_1-2z_1\geq t/\sqrt{A}\,, st-2m_2-2z_2\geq t/\sqrt{A} \vphantom{\nbsaw{s}{t,m,z}} \phantom{i^2}\right\}\,,\\
		\nmultig{s}{t,u,v, d^H\leq \Delta} :=& \Set{H  \in \nmultig{s}{t,u,v} \suchthat \max_{v\in V(H)} d^H(v)\leq \Delta }\,.
		%&\left( \int_1^2 ba ba \mathrm da \right. \\
		%&\left. ba ba ba \vphantom{\int_1^2} \right)% use height of \int
	\end{split}
\end{equation*}
That is, $\nmultig{s}{t,u,v}^*$ denotes the set of nice multigraphs in which both decomposition block self-avoiding walks have at least $t/\sqrt{A}$ edges with multiplicity $1$. %Moreover notice that for $u\neq v$, since for any $H\in \nmultig{s}{t,u,v}$ $\Ho$ and $\Ht$ are disjoint we get $\nmultig{s}{t,u,v, d^H\leq \Delta} = \nmultig{s}{t,u,v}\,.$
We also define
\begin{align*}
	\nmultig{s}{t,u,v, d^H\leq \Delta}^* &:= \nmultig{s}{t,u,v d^H\leq \Delta}\cap 	\nmultig{s}{t,u,v}^*\,,\\
	\nmultig{s}{t,u,v, d^H>\Delta}^* &:= \nmultig{s}{t,u,v}^*\setminus \nmultig{s}{t,u,v d^H\leq \Delta}\,,\\
	\nmultig{s}{t,u,v, d^H\leq \Delta,m,z} &:= \nmultig{s}{t,u,v d^H\leq \Delta}\cap 	\nmultig{s}{t,u,v,m,z}\,.
\end{align*}
\end{definition}

\paragraph{Variance of the products of walks with different first pivots} We consider first the settings $u, v\in [n], u\neq v$.
We provide an upper bound on the expectation of nice multigraphs in $\nmultig{s,t}{u,u}^*$.

\begin{lemma}\label{lem:variance-nice-multigraphs-different-pivot}
	Consider the settings of \cref{thm:sbm-bound-second-moment-large-t}. Let $u,v \in [n]$ and	let $m\geq 0$ be an integer. 
	Then for any $H\in \nmultig{s,t}{u,v}^*$ with decomposition block self-avoiding walks $\Ho,\Ht$
	\begin{align*}
		\E \Brac{\mathbf{Y}_H}\leq (1+o(1))\E \Brac{\mathbf{Y}_{\Ho}} \cdot E \Brac{\mathbf{Y}_{\Ht}}\,.
	\end{align*}
	Moreover for any $H\in \nmultig{s}{t,u,v,d^H>\Delta}^*$
	\begin{align*}
		\Abs{\E \Brac{\mathbf{Y}_H}}\leq \frac{4}{n^{1/13}}\E \overline{U}_{\Ho}(\mathbf{x})\E \overline{U}_{\Ht}(\mathbf{x})\,.
	\end{align*}
	%Then for any $H \in \nmultig{s}{t,u,v,m }^*$ with decomposition block self-avoiding walks $\Ho,\Ht$
	%\begin{align*}
	%	\text{if $u\neq v$}\quad& \E \Brac{\mathbf{Y}_H}\leq (1+o(1))\E \Brac{\mathbf{Y}_{\Ho}} \cdot E %\Brac{\mathbf{Y}_{\Ht}}\,,\\
	%	\text{if $u= v$}\quad& \todo\E \Brac{\mathbf{Y}_H}\leq C\cdot n^m\cdot \Paren{\Paren{\frac{\eps}{2}}^2 d}^{-m}\E \Brac{\mathbf{Y}_{\Ho}} \cdot E \Brac{\mathbf{Y}_{\Ht}}\,,
	%\end{align*}
	%	where $C>0$ is a universal constant.
\end{lemma}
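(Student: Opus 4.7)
The plan exploits the defining property of $\nmultig{s}{t,u,v}$ for $u\neq v$: the decomposition walks $\Ho,\Ht$ are vertex-disjoint, so the random edges underlying $\overline{\mathbf{Y}}_{\Ho}$ and $\overline{\mathbf{Y}}_{\Ht}$ lie in disjoint vertex-pair sets, and the label restrictions $\mathbf{x}|_{V(\Ho)}$ and $\mathbf{x}|_{V(\Ht)}$ are independent. For the \emph{non-truncated} polynomials this immediately yields the exact factorization
\[
\E\big[\mathbf{Y}_{\Ho\oplus \Ht}\big]
=\E\Big[\E[\mathbf{Y}_{\Ho}\mid \mathbf{x}]\cdot \E[\mathbf{Y}_{\Ht}\mid \mathbf{x}]\Big]
=\E[\mathbf{Y}_{\Ho}]\cdot \E[\mathbf{Y}_{\Ht}]
\]
by the tower property, since the inner conditional expectations depend on disjoint subsets of the coordinates of $\mathbf{x}$, and edges in disjoint vertex-pair sets are conditionally independent given $\mathbf{x}$.

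The only subtlety in the first inequality is that the truncation attaches to every $v\in V(H)$ the indicator $\mathbf{1}[d^{\mathbf{G}}(v)\leq \Delta]$, which couples the two halves through \emph{cross edges} between $V(\Ho)$ and $V(\Ht)$. I would decompose the degree of $v\in V(\Ho)$ as $d^{\mathbf{G}}(v)=d^{\mathrm{loc}}(v)+d^{\mathrm{cross}}(v)+d^{\mathrm{rest}}(v)$, the three terms counting edges inside $V(\Ho)$, to $V(\Ht)$, and to $[n]\setminus (V(\Ho)\cup V(\Ht))$ respectively, and analogously for $v\in V(\Ht)$. Since $|V(\Ho)|,|V(\Ht)|\leq st=O(\log n)$ and each cross pair is an edge with probability $O(d/n)$, the expected number of cross edges is $O((\log n)^2/n)=o(1)$. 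On the event $\cE$ that no cross edge occurs, the truncation indicators for $V(\Ho)$ and $V(\Ht)$ depend on disjoint collections of edges, and the argument of the previous paragraph transfers verbatim to $\overline{\mathbf{Y}}$. The contribution of $\cE^c$ is controlled by rerunning the edge-by-edge estimate of \Cref{lem:sbm-upper-bound-any-bsaw} after conditioning on the presence of a single specified cross edge; such conditioning perturbs every relevant edge probability only by an $O(1)$ factor, so each cross edge incurs at most an $O(1/n)$ multiplicative correction on top of $\E[\overline{\mathbf{Y}}_{\Ho}]\E[\overline{\mathbf{Y}}_{\Ht}]$, which is absorbed into $(1+o(1))$.

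The second inequality is then immediate by combining Part~1 with \Cref{lem:sbm-upper-bound-nice-bsaws-large-degree}. If $H\in \nmultig{s}{t,u,v,d^H>\Delta}^*$ has a vertex with degree exceeding $\Delta$, that vertex lies in exactly one of $V(\Ho),V(\Ht)$ by vertex-disjointness; say it is in $V(\Ho)$. Then $\Ho\in \nbsaw{s}{t}\setminus \nbsaw{s}{t,d^H\leq \Delta}$, so \Cref{lem:sbm-upper-bound-nice-bsaws-large-degree} supplies $\big|\E[\overline{\mathbf{Y}}_{\Ho}]\big|\leq 4n^{-1/12}\cdot \E\overline{U}_{\Ho}(\mathbf{x})$, whereas \Cref{lem:sbm-upper-bound-any-bsaw} gives $\big|\E[\overline{\mathbf{Y}}_{\Ht}]\big|\leq \E\overline{U}_{\Ht}(\mathbf{x})$. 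Plugging these into the factorization of Part~1 and absorbing the $(1+o(1))$ slack yields the stated $4/n^{1/13}$.

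The principal obstacle is the sharp quantification of the cross-edge perturbation in Part~1: the trivial bound $|\overline{\mathbf{Y}}_H|\leq 1$ combined with $\Pr[\cE^c]=O((\log n)^2/n)$ is much too coarse, since $\E[\overline{\mathbf{Y}}_{\Ho}]\E[\overline{\mathbf{Y}}_{\Ht}]$ itself can be as small as $(d/n)^{\Theta(st)}$. The $(1+o(1))$ tightness forces one to track the cross-edge correction \emph{inside} the moment computation underpinning \Cref{lem:sbm-upper-bound-any-bsaw}, showing that conditioning on cross edges merely shifts the relevant edge probabilities by $O(1)$ factors and hence produces a multiplicative rather than additive error.
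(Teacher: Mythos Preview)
Your approach diverges from the paper's, and there is a genuine gap.

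The paper does not argue via conditional independence. Instead it invokes the tight explicit formula for $\E[\overline{\mathbf{Y}}_H]$ already established for pleasant multigraphs (\Cref{lem:lem_main_lemma_pleasant_tight_bounds}): up to a $(1\pm n^{-1/16})$ factor,
\[
\E[\overline{\mathbf{Y}}_H]\;=\;P_s^H\cdot\Bigl(\tfrac{\epsilon d}{2n}\Bigr)^{|E_1(H)|}\Bigl(\tfrac{d}{n}\Bigr)^{|E_{\geq 2}(H)|},
\qquad
P_s^H=\prod_{v\in V(H)}P_s\bigl(\Delta-d^H(v)\bigr).
\]
Since $V(\Ho)\cap V(\Ht)=\varnothing$, the edge counts add and $P_s^H=P_s^{\Ho}P_s^{\Ht}$, so the formula factors exactly and the $(1+o(1))$ bound follows at once (\Cref{lem:lem_E_Y_H_Product_pleasant}, \Cref{lem:lem_E_Y_H_Product_nbsaw_disjoint}). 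For the second inequality the paper applies the large-degree case of the same tight-bound lemma directly to $H$, not via factoring.

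Your conditional-independence route overlooks a second coupling. You correctly isolate cross edges, but even after conditioning on their absence, $\E[\overline{\mathbf{Y}}_{\Ho}\mid\mathbf{x}]$ is \emph{not} a function of $\mathbf{x}|_{V(\Ho)}$ alone: for every $v\in V(\Ho)$ the distribution of the outside degree entering the truncation indicator depends on $(\mathbf{x}_u)_{u\in[n]\setminus(V(\Ho)\cup V(\Ht))}$ through the edge probabilities $(1+\tfrac{\epsilon}{2}\mathbf{x}_v\mathbf{x}_u)\tfrac{d}{n}$, and $\E[\overline{\mathbf{Y}}_{\Ht}\mid\mathbf{x}]$ depends on the \emph{same} outside labels. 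Hence the step
$\E_{\mathbf{x}}\bigl[\E[\overline{\mathbf{Y}}_{\Ho}\mid\mathbf{x}]\,\E[\overline{\mathbf{Y}}_{\Ht}\mid\mathbf{x}]\bigr]=\E[\overline{\mathbf{Y}}_{\Ho}]\,\E[\overline{\mathbf{Y}}_{\Ht}]$
does not follow from $V(\Ho)\cap V(\Ht)=\varnothing$. This is exactly what the paper's well-behaved-event machinery (\Cref{def:def_well_behaved_H}, \Cref{lem:lem_prob_nice_d}) is built to handle: on the approximately-balanced event the outside-degree law becomes a fixed function independent of $\mathbf{x}$, which is what makes $P_s^H$ a product over vertices in the first place. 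Your sketch could be salvaged by importing that argument, but the assertion that cross edges are ``the only subtlety'' is incorrect. A smaller issue: Part~1 as stated is a one-sided inequality, so deducing the absolute-value bound of Part~2 by ``plugging in'' requires either a matching lower bound or a direct argument on $H$, which is what the paper does.
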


We prove \cref{lem:variance-nice-multigraphs-different-pivot} in \cref{sec:bounds-products-bsaws}. 
The first inequality  of \cref{thm:sbm-bound-second-moment-large-t} follows directly.
\begin{lemma}
	Consider the settings of \cref{thm:sbm-bound-second-moment-large-t}. Let $u,v \in [n]\,, u\neq v$. Then  
	\begin{align*}
		\E \Brac{\Paren{\Q(\overline{\mathbf{Y}})^t}_{uu}\Paren{\Q(\overline{\mathbf{Y}})^t}_{vv}}\leq (1+o(1))	\E \Brac{\Paren{\Q(\overline{\mathbf{Y}})^t}_{uu}}^2\,.
	\end{align*}
	\begin{proof}
		By  \cref{lem:sbm-upperbound-negligible-multigraphs} 
		\begin{align*}
			\E \Brac{\Paren{\Q(\overline{\mathbf{Y}})^t}_{uu}\Paren{\Q(\overline{\mathbf{Y}})^t}_{vv}}
			&\leq \underset{H \in \nmultig{s,t}{u,v}}{\sum} \Brac{\E \Brac{\mathbf{Y}_H}+ \Paren{n^{-1/6}+(1+\delta)^{t\sqrt{s}}}U_H(\mathbf{x}) }\\
			&\leq \underset{H \in \nmultig{s,t}{u,v, d^H\leq \Delta}}{\sum} \Brac{\E \Brac{\mathbf{Y}_H}+ \Paren{n^{-1/6}+(1+\delta)^{t\sqrt{s}}}U_H(\mathbf{x}) } \\
			&\quad +  \underset{H \in \nmultig{s,t}{u,v, d^H>\Delta}}{\sum}\E \Brac{U_H(\mathbf{x}) }\cdot \Paren{1+n^{-1/6}+(1+\delta)^{t\sqrt{s}}}\,.
		\end{align*}
		By \cref{lem:sbm-upper-bound-nice-bsaws-large-degree}, \cref{lem:count-nice-walks-large-degree} and \cref{lem:variance-nice-multigraphs-different-pivot}
		\begin{align*}
			\underset{H \in \nmultig{s,t}{u,v, d^H>\Delta}}{\sum}\E \Brac{U_H(\mathbf{x}) }
			&\leq \frac{1}{n^{1/13}}\underset{H \in \nmultig{s,t}{u,v, d^H>\Delta}}{\sum}\E \Brac{U_{\Ho}(\mathbf{x}) }\E \Brac{U_{\Ht}(\mathbf{x}) }\\
			&\leq \frac{2^{2t}}{n^{1/13}}\underset{m,z\geq 1}{\sum}\quad \underset{H \in \nmultig{s,t}{u,v, d^H>\Delta,m,z}}{\sum}\E \Brac{U_{\Ho}(\mathbf{x}) }\E \Brac{U_{\Ht}(\mathbf{x}) }\\
			&\leq \frac{1}{n^{1/14}} \underset{m,z\geq 0}{\sum}\quad \underset{H \in \nmultig{s,t}{u,v, d^H\leq \Delta}}{\sum}\E \Brac{U_{\Ho}(\mathbf{x}) }\E \Brac{U_{\Ht}(\mathbf{x}) }\,.
		\end{align*}
		On the other hand  by \cref{lem:sbm-lower-bound-nice-bsaws}, \cref{lem:sbm-upper-bound-nice-bsaws-few-edges},  \cref{lem:count-nice-walks-few-edges-multiplicity-1} and \cref{lem:variance-nice-multigraphs-different-pivot}
		\begin{align*}
			 &\underset{H \in \nmultig{s,t}{u,v, d^H\leq \Delta}\setminus \nmultig{s}{t,u,v,d^H\leq \Delta}^*}{\sum}\E \Brac{\overline{U}_H(\mathbf{x})}\\
			 &\leq \underset{H \in \nmultig{s,t}{u,v, d^H\leq \Delta}\setminus \nmultig{s,t}{u,v, d^H\leq \Delta}^*}{\sum}\E \Brac{U_{\Ho}(\mathbf{x}) }\E \Brac{U_{\Ht}(\mathbf{x}) }\\
			 &\leq \Brac{(st)\cdot 2^{10t}\cdot s^t\cdot (1+\delta)^{-st/2}}^2 \underset{H \in \nmultig{s,t}{u,v, d^H\leq \Delta}^*}{\sum} \E \Brac{U_{\Ho}(\mathbf{x}) }\E \Brac{U_{\Ht}(\mathbf{x}) }\\
			 &\leq o(1)\underset{H \in \nmultig{s,t}{u,v, d^H\leq \Delta}^*}{\sum} L_{\Ho}L_{\Ht}\\
			 &\leq o(1)\underset{H \in \nmultig{s,t}{u,v, d^H\leq \Delta}^*}{\sum}\E \Brac{\overline{\mathbf{Y}}_{\Ho}}\E \Brac{\overline{\mathbf{Y}}_{\Ht}}\,.
		\end{align*}
		All in all we get
		\begin{align*}
			&\E \Brac{\Paren{\Q(\overline{\mathbf{Y}})^t}_{uu}\Paren{\Q(\overline{\mathbf{Y}})^t}_{vv}}\leq (1+o(1))\underset{H \in \nmultig{s,t}{u,v,d^H\leq \Delta}^*}{\sum} \E \Brac{\mathbf{Y}_{\Ho}}\E \Brac{\mathbf{Y}_{\Ht}}\\
			&\leq \Paren{1+o(1)}\underset{H \in \nmultig{s,t}{u,v, d^H\leq \Delta}^*}{\sum} \E \Brac{\mathbf{Y}_{\Ho}}\E \Brac{\mathbf{Y}_{\Ht}}\\
			&\leq  \Paren{1+o(1)}\underset{H \in \bsaw{s,t}{u}\times \bsaw{s,t}{v}}{\sum} \E \Brac{\mathbf{Y}_{\Ho}}\E \Brac{\mathbf{Y}_{\Ht}}
		\end{align*}
		where the last step follows by the fact that by \cref{lem:sbm-upperbound-negligible-walks} and  \cref{lem:sbm-contribution-nice-walks} 
		\begin{align*}
			\underset{\substack{m,z\geq 0\\ st-2m-2z\geq t/\sqrt{A}}}{\sum}\underset{H \in \nbsaw{s}{t,u, d^H\leq \Delta, m,z}}{\sum} \E  \Brac{\mathbf{Y}_{H}}\geq (1-o(1))	\underset{H \in \bsaw{s}{t,u}}{\sum} \E  \Brac{\mathbf{Y}_{H}}\,.
		\end{align*}
		 as for \cref{thm:truncation-schatten-norm-lower-bound}.
	\end{proof}
\end{lemma}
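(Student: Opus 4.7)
The plan is to reduce the left-hand side, via the same expansion that produced \cref{eq:sbm-trace-expansion}, to a combinatorial sum over pairs of block self-avoiding walks $(\Ho,\Ht)\in\bsaw{s}{t,u}\times\bsaw{s}{t,v}$, exploit the disjoint-support structure that ``nice'' pairs enjoy when $u\neq v$ in order to factor the expectation, and then recognize the two factors as the diagonal expectations appearing on the right-hand side. Applying the trace expansion to each diagonal entry gives
\[
\E\Brac{(\Q(\overline{\mathbf{Y}})^t)_{uu}(\Q(\overline{\mathbf{Y}})^t)_{vv}} = (1\pm o(1))\cdot n^{2t}\Paren{\tfrac{2}{\epsilon d}}^{2st}\underset{H\in \bsaw{s}{t,u}\times \bsaw{s}{t,v}}{\sum}\E\Brac{\overline{\mathbf{Y}}_H}\,,
\]
where each summand corresponds to the multigraph $H=\Ho\oplus\Ht$.

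First I would invoke \cref{lem:sbm-upperbound-negligible-multigraphs} to discard every multigraph outside $\nmultig{s}{t,u,v}$, at the cost of a $(n^{-1/6}+(1+\delta)^{-t\sqrt{s}})$ factor that is absorbed into $1+o(1)$. Within the nice set I would then treat two further small sub-collections: (i) multigraphs with some vertex of degree exceeding $\Delta$, controlled by combining \cref{lem:count-nice-walks-large-degree} with the second inequality of \cref{lem:variance-nice-multigraphs-different-pivot} and losing only an $n^{-1/13}$ factor; and (ii) multigraphs whose decomposition walks $\Ho$ or $\Ht$ have fewer than $t/\sqrt{A}$ multiplicity-one edges, controlled by \cref{lem:count-nice-walks-few-edges-multiplicity-1} together with \cref{lem:sbm-upper-bound-nice-bsaws-few-edges} and losing a $(1+\delta)^{-\Omega(t)}$ factor. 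Both contributions are negligible.

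After these reductions, the sum is effectively over $\nmultig{s}{t,u,v, d^H\leq \Delta}^*$. The defining property for $u\neq v$ is $V(\Ho)\cap V(\Ht)=\emptyset$, and the first inequality of \cref{lem:variance-nice-multigraphs-different-pivot} then yields $\E[\overline{\mathbf{Y}}_H]\leq (1+o(1))\E[\overline{\mathbf{Y}}_{\Ho}]\E[\overline{\mathbf{Y}}_{\Ht}]$. Extending the indexing back to the full Cartesian product (the previously pruned walks contribute $o(1)$ to each marginal by the same lemmas), the sum factors as
\[
\Paren{\underset{\Ho\in\bsaw{s}{t,u}}{\sum}\E[\overline{\mathbf{Y}}_{\Ho}]}\Paren{\underset{\Ht\in\bsaw{s}{t,v}}{\sum}\E[\overline{\mathbf{Y}}_{\Ht}]}\,.
\]
Each of the two factors, by running the one-pivot version of \cref{eq:sbm-trace-expansion} backwards, equals $(1\pm o(1))\,n^{-t}(\epsilon d/2)^{st}\cdot\E[(\Q(\overline{\mathbf{Y}})^t)_{uu}]$ (and similarly for $vv$). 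By exchangeability of vertices under $\sbm$ the two diagonal expectations coincide, so multiplying out and plugging back into the opening display gives the desired bound $(1+o(1))\E[(\Q(\overline{\mathbf{Y}})^t)_{uu}]^2$.

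The main obstacle is justifying the factorization $\E[\overline{\mathbf{Y}}_H]\leq (1+o(1))\E[\overline{\mathbf{Y}}_{\Ho}]\E[\overline{\mathbf{Y}}_{\Ht}]$ for nice disjoint-support multigraphs. Conditionally on $\mathbf{x}$ the factorization is automatic since the two walks involve disjoint Bernoulli edge variables; the nontrivial point is the outer expectation over $\mathbf{x}$, which couples the two factors through the character products $\prod_{ij\in E_1(\Ho)}\mathbf{x}_i\mathbf{x}_j$ and $\prod_{ij\in E_1(\Ht)}\mathbf{x}_i\mathbf{x}_j$ living on disjoint index sets. Balancing the contributions of multiplicity-one edges, multiplicity-two edges, and the truncation bias with enough precision to keep the factorization tight up to $1+o(1)$ is the technical heart of \cref{lem:variance-nice-multigraphs-different-pivot} and is where the analysis is most delicate.
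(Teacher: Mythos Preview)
Your proposal is correct and follows essentially the same route as the paper: reduce to nice pairs via \cref{lem:sbm-upperbound-negligible-multigraphs}, prune the large-degree and short-cycle sub-collections using the same counting and upper-bound lemmas, then apply the factorization of \cref{lem:variance-nice-multigraphs-different-pivot} on the remaining disjoint-support pairs and pull the product back to the squared diagonal expectation by symmetry. One small imprecision in your heuristic paragraph: even conditionally on $\mathbf{x}$, the truncated variables $\overline{\mathbf{Y}}_{\Ho}$ and $\overline{\mathbf{Y}}_{\Ht}$ are \emph{not} automatically independent, since the truncation indicator for a vertex in $V(\Ho)$ depends on edges to $V(\Ht)$; this weak coupling is precisely what the well-behaved-event machinery behind \cref{lem:variance-nice-multigraphs-different-pivot} is there to control, and you are right to defer to that lemma for the actual work.
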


\paragraph{Concentration of diagonal entries}  It remains to prove the second inequality of \cref{thm:sbm-bound-second-moment-large-t}. Let $u\in [n]$.
Consider the sets
\begin{align*}
	\nmultig{s,t}{u,u,\ell}&:= \Set{H \in \nmultig{s}{t,u,u}\suchthat \Card{E(\Ho)\cap E(\Ht)}=\ell}\\
	\nmultig{s,t}{u,u}^{**}&:= \Set{H\in \nmultig{s}{t,u,u}^*\suchthat \Card{E_{1}(H)}\geq t/A}	\,.
\end{align*}

\begin{lemma}\label{lem:variance-nice-multigraphs-same-pivot}
	Consider the settings of \cref{thm:sbm-bound-second-moment-large-t}. Let $u,v \in [n]$ and	let $m\geq 0$ be an integer.  
	Then for any $H\in \nmultig{s,t}{u,u,\ell}^{**}$ with decomposition block self-avoiding walks $\Ho,\Ht$.
	If $\Card{E_1(\Ho)\cap E_{\geq 2}(\Ht)}+\Card{E_1(\Ht)\cap E_{\geq 2}(\Ho)}\leq \frac{\log n}{12\log \Paren{\frac{2}{\eps}}}$  and  $\max_{v\in V(H)}d^H(v)\leq \Delta$
	\begin{align*}
		\E \Brac{\mathbf{Y}_H}\leq \Paren{1+\Paren{1+\delta}^{-s}}\cdot  \Paren{1+\frac{\delta}{10}}^{-\ell}\cdot\E \Brac{\mathbf{Y}_{\Ho}}  E \Brac{\mathbf{Y}_{\Ht}}\,.
	\end{align*}
	If $\Card{E_1(\Ho)\cap E_{\geq 2}(\Ht)}+\Card{E_1(\Ht)\cap E_{\geq 2}(\Ho)}> \frac{\log n}{12\log \Paren{\frac{2}{\eps}}}$  or  $\max_{v\in V(H)}d^H(v)> \Delta$
	\begin{align*}
		\Abs{\E \Brac{\mathbf{Y}_H}}\leq \frac{1}{n^{1/13}}\E \Brac{\overline{U}_{\Ho}(\mathbf{x})} \E \Brac{\overline{U}_{\Ht}(\mathbf{x})}\,.
	\end{align*}
\end{lemma}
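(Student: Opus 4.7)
The plan is to handle the two bullets separately, mirroring the structure of \cref{lem:variance-nice-multigraphs-different-pivot} but carefully accounting for the additional coupling introduced by the two walks sharing the same first pivot $u$.

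For the second (``bad'') case, I would use the deterministic bound $|\E[\mathbf{Y}_H \mid \mathbf{x}]| \leq \overline{U}_H(\mathbf{x})$ from \cref{lem:sbm-upper-bound-any-bsaw} together with the multiplicative structure of $\overline{U}$ to compare $\E[\overline{U}_H(\mathbf{x})]$ with $\E[\overline{U}_{\Ho}(\mathbf{x})]\E[\overline{U}_{\Ht}(\mathbf{x})]$. Under $\max_{v\in V(H)} d^H(v) > \Delta$, the $n^{-\frac{1}{4}(d^H_{\geq 2}(v)-\Delta)}$ suppression in $\overline{U}_H$ directly supplies $n^{-1/13}$. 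Under the large-count-of-mixed-edges condition, each edge in $E_1(\Ho)\cap E_{\geq 2}(\Ht)$ (or vice versa) causes $\overline{U}_H$ to incur a $(1+\epsilon x_a x_b/2)d/n$ factor in place of the $(\epsilon d x_a x_b/(2n))$ factor that would appear in $\overline{U}_{\Ho}$, losing a factor of order $\epsilon/2$ relative to the product; the chosen threshold $\log n/(12\log(2/\epsilon))$ then makes this gap at least $(\epsilon/2)^{\log n/(12\log(2/\epsilon))}=n^{-1/12}$, which suffices.

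For the first (main) case, I would exploit the nice-walk structure and conditional independence of edge indicators given $\mathbf{x}$. Using \cref{fact:sbm-expectation-polynomial-2} and \cref{fact:sbm-expectation-powers-polynomial-2}, write $\E[\mathbf{Y}_H\mid\mathbf{x}]=\prod_{e\in E(H)}\E[\mathbf{Y}_e^{m_H(e)}\mid\mathbf{x}]$ and analogously for $\Ho,\Ht$, and average over $\mathbf{x}$ via Eulerian-subgraph sums. Since $\Ho,\Ht\in\nbsaw{s}{t,u}$ are nice, each $E_1(\Ho)$, $E_1(\Ht)$ is a cycle with attached $E_{\geq 2}$-forest that integrates out cleanly, giving $\E[\mathbf{Y}_{\Ho}]=(1\pm o(1))\overline{L}_{\Ho}$ as in \cref{lem:sbm-lower-bound-nice-bsaws}. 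The dominant Eulerian contribution to $\E[\mathbf{Y}_H]$ arises from closing the residual $E_1(H)$ cycles together with the shared edges, and can be compared with the product $\E[\mathbf{Y}_{\Ho}]\E[\mathbf{Y}_{\Ht}]$ via a local per-shared-edge correction: for each edge in $E_1(\Ho)\cap E_1(\Ht)$ the correction becomes $(1+\delta)^{-1}(1+\epsilon x_a x_b/2)$ after applying $\epsilon^2 d/4=1+\delta$ and the vertex-identification bookkeeping (the factor of $n$ in the naive ratio cancels with the Eulerian constraint that forces the two endpoints of the shared edge to coincide between the two walks). For the rarer shared edges in $E_{\geq 2}(\Ho)\cap E_{\geq 2}(\Ht)$ one obtains an analogous $(1+\delta)^{-1}$-type factor; the case-(i) hypothesis on mixed shared edges lets us absorb them into the $(1+(1+\delta)^{-s})$ slack. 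Averaging the $(1+\epsilon x_a x_b/2)$ term over $\mathbf{x}$ and loosening $(1+\delta)^{-1}$ to $(1+\delta/10)^{-1}$ yields the stated $(1+\delta/10)^{-\ell}$ bound.

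The main technical obstacle is precisely this $\mathbf{x}$-averaging when shared edges couple the two Eulerian structures. Unlike the disjoint-pivot setting of \cref{lem:variance-nice-multigraphs-different-pivot}, here $\Ho$ and $\Ht$ may share vertices and edges around $u$, so the Eulerian-subgraph sum no longer cleanly factorizes between the two walks. The key argument is to show, using the nice structure together with the case-(i) hypotheses $\max_v d^H(v)\leq\Delta$ and the bound on mixed shared edges, that the dominant Eulerian contribution to $\E[\mathbf{Y}_H]$ is exactly the one built from the dominant Eulerian subgraphs of $\Ho$ and $\Ht$ with a controlled local correction at each shared edge, and that all other Eulerian subgraphs contribute negligibly (at most $(1+(1+\delta)^{-s})$ in total) thanks to the same combinatorial trace-type estimates developed in \cref{sec:lowerbound-Schatten-norm} for single nice walks.
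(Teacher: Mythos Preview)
Your proposal has two concrete gaps, and the approach differs substantially from the paper's.

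\textbf{The bad case.} You argue that when $\max_v d^H(v)>\Delta$, the factor $n^{-\frac14(d^H_{\geq 2}(v)-\Delta)}$ in $\overline{U}_H$ supplies the needed $n^{-1/13}$. But for $H\in\nmultig{s,t}{u,u}$ one has $d^H_{\geq 2}(v)\leq\Delta$ for every vertex by definition (see \cref{lem:sbm-upperbound-negligible-multigraphs}), so $\mathcal L_{\geq 2}(H)=\varnothing$ and that suppression factor is absent from $\overline{U}_H$. The relevant situation here is $d^H_1(v)+d^H_{\geq 2}(v)>\Delta$ with $d^H_{\geq 2}(v)\leq\Delta$, and $\overline{U}_H$ gives no gain for it. The paper handles this case not via $\overline{U}_H$ but via the tight-bound machinery of \cref{lem:lem_main_lemma_pleasant_tight_bounds}: when some vertex has $d^H_1(v)+d^H_{\geq 2}(v)>\Delta$ the safety probability $P_s^H$ vanishes (such a vertex is truncated deterministically once the $E_{\geq 2}$-edges are present), and only the $O(n^{-1/12})$ error terms survive. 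Similarly, the mixed-edge condition in the statement corresponds to $|E_{\geq 2}'''(H)|$ being large, and again it is \cref{lem:lem_main_lemma_pleasant_tight_bounds} that delivers the bound.

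\textbf{The good case.} Your claim that ``the factor of $n$ in the naive ratio cancels with the Eulerian constraint'' is incorrect. For fixed $\Ho,\Ht$ the vertices are already pinned; there is no combinatorial freedom to absorb an $n$. The per-shared-edge factor really is of order $n/(1+\delta)$, exactly as your own computation shows, and this is what the paper proves in \cref{lem:lem_E_Y_H_Product_nbsaw_sharing}: the bound carries a factor $\bigl(\tfrac{4n\,e^{16d/\Delta}}{\epsilon^2 d(1-\eta)}\bigr)^{|E(\Ho)\cap E(\Ht)|}$. (The statement as printed in \cref{lem:variance-nice-multigraphs-same-pivot} appears to be missing the $n^{\ell}$; note that the downstream application explicitly inserts $n^\ell$ before invoking \cref{lem:second-moment-counting-nice-multigraphs} to cancel it.) Beyond this, your edge-wise factorization $\E[\mathbf Y_H\mid\mathbf x]=\prod_e\E[\mathbf Y_e^{m_H(e)}\mid\mathbf x]$ is valid only for the non-truncated $\mathbf Y$; for the truncated $\overline{\mathbf Y}_H$ that is actually at stake here, the truncation couples the edges and this product form fails. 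The paper's route is to apply the tight two-sided estimates of \cref{lem:lem_main_lemma_pleasant_tight_bounds} separately to $H$, $\Ho$, $\Ht$, obtaining expressions $P_s^{\bullet}(\epsilon d/2n)^{\cdots}(d/n)^{\cdots}$, and then to control the ratio $P_s^H/(P_s^{\Ho}P_s^{\Ht})$ using the vertex-product structure of $P_s$ from \cref{lem:lem_Prob_safety_pleasant_wb}. The edge-count bookkeeping (\cref{eq:eq_lem_E_Y_H_Product_nbsaw_sharing_2}, \cref{eq:eq_lem_E_Y_H_Product_nbsaw_sharing_3}) then produces the $(4n/\epsilon^2 d)^\ell$ factor directly.
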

We prove \cref{lem:variance-nice-multigraphs-same-pivot} in \cref{sec:bounds-products-bsaws}. 
We introduce an additional counting argument.

\begin{lemma}
	\label{lem:second-moment-counting-nice-multigraphs}
	Consider the settings of \cref{thm:sbm-bound-second-moment-large-t}. Let $u\in [n]$ and 
	Then for $\ell\neq st$
	\begin{align*}
		\Card{\nmultig{s,t}{u,u, \ell}}\leq \frac{\ell^{10}}{n^\ell}	\Card{\nmultig{s,t}{u,u, 0}}\,.
	\end{align*}
	For $\ell=st$
	\begin{align*}
		\Card{\nmultig{s,t}{u,u, \ell}}\leq \frac{4}{n^{\ell-1}}	\Card{\nmultig{s,t}{u,u, 0}}\,.
	\end{align*}	
\end{lemma}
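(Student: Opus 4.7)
The plan is to construct a \emph{decoupling correspondence} between $\nmultig{s,t}{u,u,\ell}$ and $\nmultig{s,t}{u,u,0}$. Given $H\in \nmultig{s,t}{u,u,\ell}$ with decomposition walks $\Ho,\Ht$, set $S=E(\Ho)\cap E(\Ht)$ with $|S|=\ell$. Every shared edge has multiplicity at least $2$ in $H$, so $S\subseteq E_{\geq 2}(H)$; by the niceness conditions, $E_{\geq 2}(H)$ is a forest attached to $E_1(H)$ via single vertices, except for the special component $B^{uu}$ at the pivot $u$, which attaches via at most $4$ edges and $2$ vertices. Consequently the shared edges form a forest, and the vertex set $V_S$ of $S$ satisfies $|V_S\setminus\{u\}|\geq \ell$ whenever $\ell<st$.

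The forward map decouples $\Ht$ from $\Ho$ along $S$: pick an ordered injection from $V_S\setminus\{u\}$ into the free labels $[n]\setminus V(H)$ and replace every occurrence in $\Ht$ of a vertex of $V_S\setminus\{u\}$ by its image, producing a relabeled walk $\Ht'$. The pair $(\Ho,\Ht')$ lies in $\nmultig{s,t}{u,u,0}$: relabeling preserves the niceness of $\Ht$, freshness ensures edge-disjointness $E(\Ho)\cap E(\Ht')=\emptyset$, and the niceness conditions on $E_1$, $E_{\geq 2}$, and $B^{uu}$ decompose naturally across the now-disjoint sub-walks. Since $|V(H)|\leq 2st\ll n$, this construction yields at least $(n/2)^\ell$ distinct images per preimage. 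To invert the map, one specifies the forest $S$ as a subgraph of $\Ho$ together with the bijection between the fresh labels in $\Ht'$ and $V_S\setminus\{u\}$; the tree-like structure of the shared forest --- combined with the constraint that $S$ sits inside the structured forest $E_{\geq 2}(H)$ of a nice multigraph --- cuts down these combinatorial choices to at most $\ell^{10}$. Double counting then gives $\Card{\nmultig{s,t}{u,u,\ell}}\leq \frac{\ell^{10}}{n^\ell}\Card{\nmultig{s,t}{u,u,0}}$.

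The degenerate case $\ell=st$ requires a separate argument: both walks share all their edges, so $E(\Ho)=E(\Ht)$ as multisets, and since each is a closed block self-avoiding walk pinned at $u$ with a rigid length-$s$ block decomposition, $\Ht$ is determined by $\Ho$ up to at most $4$ structural symmetries (reversal and a bounded number of block-boundary permutations fixing $u$). Hence $|\nmultig{s,t}{u,u,st}|\leq 4\cdot|\nbsaw{s}{t,u}|$; meanwhile the count $|\nmultig{s,t}{u,u,0}|\geq \Omega(n^{st-1})\cdot|\nbsaw{s}{t,u}|$ follows because each $\Ho\in \nbsaw{s}{t,u}$ admits $\Omega(n^{st-1})$ companion walks $\Ht$ sharing only the pivot. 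Combining these bounds yields the factor $4/n^{st-1}$. The main obstacle is establishing the polynomial tag bound of $\ell^{10}$ (rather than, say, the naive $(st)^{O(\ell)}$) and verifying that the niceness conditions --- especially the delicate conditions on the special component $B^{uu}$ --- survive the decoupling; both reduce to a careful combinatorial analysis of how the shared-forest sits inside $E_{\geq 2}(H)$ and how it interfaces with the cycle-like structure of $E_1(H)$ in nice multigraphs.
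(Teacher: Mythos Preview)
Your approach is genuinely different from the paper's. The paper does not build a correspondence; instead it shows directly that $|\nmultig{s,t}{u,u,\ell}|$ and $|\nmultig{s,t}{u,u,0}|$ are, up to constant factors, $n^{2st-\ell-2}$ and $n^{2st-2}$. The key step is a lemma showing that $\nmultig{s,t}{u,u,\ell}$ is dominated (within a factor~$2$) by the set $\mathrm{CyclePair}_{u,u,\ell}$ of pairs of length-$st$ cycles through $u$ sharing exactly a length-$\ell$ path containing $u$. This is proved by a parameterized encoding: any $H$ outside $\mathrm{CyclePair}$ must carry extra multiplicity-$\geq 3$ edges or extra high-multiplicity segments, each costing a factor $n^{-\Omega(1)}$ in the vertex count while gaining only $\mathrm{poly}(st)$ combinatorial choices; summing the geometric series shows the non-$\mathrm{CyclePair}$ part is $o(1)$ of the whole.

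Your decoupling idea is natural, but the proposal has a real gap exactly where you flag ``the main obstacle''. The claim that each image $(\Ho,\Ht')\in\nmultig{s,t}{u,u,0}$ has at most $\ell^{10}$ preimages is asserted, not proved. Inverting the map means choosing a subset $S\subseteq E(\Ho)$ of size $\ell$ to become the shared edges, together with the bijection from the fresh labels of $\Ht'$ back to $V_S\setminus\{u\}$; naively this costs $\binom{st}{\ell}\cdot\ell!$, far larger than $\ell^{10}$. Reducing this to $\mathrm{poly}(\ell)$ requires showing that $S$ is essentially forced---for instance that it must be a single path through $u$ inside the cycle $E_1(\Ho)$, pinned down by $O(\ell)$ bits. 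But establishing that constraint is exactly the content of the paper's $\mathrm{CyclePair}$ lemma, and it needs the same kind of parameterized vertex-counting you are trying to sidestep. So the correspondence approach does not bypass the hard combinatorics; it relocates them to the preimage bound, which you have not carried out. A secondary issue: your forward map relabels only $V_S\setminus\{u\}$, but $\Ho$ and $\Ht$ may share vertices outside $V_S$ (vertices with no shared incident edge), and you have not checked that the resulting $H'$ still satisfies the niceness conditions on $E_1(H')$ and the component $B^{uu}$.
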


We are now ready to prove the inequality.

\begin{lemma}
	Consider the settings of \cref{thm:sbm-bound-second-moment-large-t}. Let $u \in [n]$. Then  
	\begin{align*}
		\E \Brac{\Paren{\Q(\overline{\mathbf{Y}})^t}_{uu}\Paren{\Q(\overline{\mathbf{Y}})^t}_{uu}}\leq C\cdot 	\E \Brac{\Paren{\Q(\overline{\mathbf{Y}})^t}_{uu}}^2\,,
	\end{align*}
	where $C$ is a universal constant.
	\begin{proof}
		By  \cref{lem:sbm-upperbound-negligible-multigraphs} 
		\begin{align*}
			\E \Brac{\Paren{\Q(\overline{\mathbf{Y}})^t}_{uu}\Paren{\Q(\overline{\mathbf{Y}})^t}_{uu}}
			&\leq \underset{H \in \bsaw{s}{t,u}\times \bsaw{s}{t,u}}{\sum} \E \Brac{\mathbf{Y}_H}\\
			&\leq \underset{H \in \nmultig{s,t}{u,u}}{\sum} \Brac{\E \Brac{\mathbf{Y}_H}+ \Paren{n^{-1/6}+(1+\delta)^{t\sqrt{s}}}U_H(\mathbf{x}) }\,.
			%&\leq (1+o(1)) \underset{H \in \nmultig{s,t}{u,u}}{\sum} \Brac{\E \Brac{\mathbf{Y}_H}+ \Paren{n^{-1/6}+(1+\delta)^{t\sqrt{s}}}U_H(\mathbf{x}) }
			%
			%\underset{H \in \nmultig{s,t}{u,u, d^H\leq \Delta}}{\sum} \Brac{\E \Brac{\mathbf{Y}_H}+ \Paren{n^{-1/6}+(1+\delta)^{t\sqrt{s}}}U_H(\mathbf{x}) } \\
			%&\quad +  \underset{H \in \nmultig{s,t}{u,u}\setminus \nmultig{s,t}{u,u, d^H\leq \Delta}}{\sum}\E \Brac{\mathbf{Y}_H}\,.
		\end{align*}
		For any $\ell\leq st$, we split the set $\nmultig{s,t}{u,u}$ in
		\begin{align*}
			S_{1,\ell}&:=\nmultig{s,t}{u,u, \ell}^{**}\setminus \nmultig{s,t}{u,u,  d^H\leq \Delta}\\
			S_{2,\ell}&:=\nmultig{s,t}{u,u,\ell}\setminus \nmultig{s,t,}{u,u, \ell}^*\\
			S_{3,\ell}&:=\nmultig{s}{t,u,u, \ell}^*\setminus \nmultig{s}{t,u,u}^{**} \\
			S_{4,\ell}&:= \nmultig{s}{t,u,u,\ell, d^H\leq \Delta}^{**}\,.
		\end{align*}
		Notice that $S_{1,\ell}\cup S_{2,\ell}\cup S_{3,\ell}\cup S_{4,\ell}=\nmultig{s}{t,u,u}$. We bound each term separately.
		Consider the $S_{1,\ell}$  by \cref{lem:sbm-lower-bound-nice-bsaws}, \cref{lem:second-moment-counting-nice-multigraphs} and \cref{lem:variance-nice-multigraphs-same-pivot}
		\begin{align*}
			\underset{\ell \leq st}{\sum}\quad\underset{H\in S_{1,\ell}}{\sum}\E \Brac{\overline{U}_H(\mathbf{x}) }%\E \Brac{\mathbf{Y}_H}
			&\leq 
			\frac{1}{n^{1/13}}\underset{\ell \leq st}{\sum}\frac{\Paren{1+\delta}^{-\ell}}{n^{1/50A}}n^{\ell}\underset{H\in S_{1,\ell}}{\sum}\E \Brac{U_{\Ho}(\mathbf{x}) }\E \Brac{U_{\Ht}(\mathbf{x}) }\\
			&\leq \frac{1}{n^{1/14}}\underset{H\in S_{1,0}}{\sum }L_{\Ho}L_{\Ht}\\
			&\leq \frac{1}{n^{1/14}}\underset{H \in S_{1,0}}{\sum}\E \Brac{\mathbf{Y}_{\Ho} }\E \Brac{\mathbf{Y}_{\Ht} }\,.
		\end{align*}
		Next we consider  $S_{2,\ell}$. Notice that if $H\in S_{2,\ell}$ then $E_{1}(H)\leq 2t/{\sqrt{A}}$
		by \cref{lem:count-nice-walks-few-edges-multiplicity-1} and \cref{lem:second-moment-counting-nice-multigraphs}
		\begin{align*}
			%\underset{\ell \leq st}{\sum}\quad \underset{H \in S_{2,\ell}}{\sum}\E \Brac{\mathbf{Y}_{H} }	&\leq
			\underset{\ell \leq st}{\sum}\quad \underset{H \in S_{2,\ell}}{\sum}\E \Brac{\overline{U}_H(\mathbf{x}) }
			&\leq \underset{\ell \leq st}{\sum}\frac{\Paren{1+\delta}^{-\ell}}{n^{1/50A}} n^{\ell}\underset{H \in S_{2,\ell}}{\sum}\E \Brac{\overline{U}_{\Ho}(\mathbf{x}) }\E \Brac{\overline{U}_{\Ht}(\mathbf{x}) }\\
			&\leq \frac{O(1)}{n^{1/50A}}\underset{H \in S_{2,0}}{\sum}\E \Brac{\overline{U}_{\Ho}(\mathbf{x}) }\E \Brac{\overline{U}_{\Ht}(\mathbf{x}) }\\
			&\leq \frac{O(1)}{n^{1/50A}}\cdot (1+\delta)^{-t\sqrt{s}/2}\underset{H\in \nmultig{s}{t,u,u,0,0}}{\sum}	\E \Brac{\overline{U}_{\Ho}(\mathbf{x}) }\E \Brac{\overline{U}_{\Ht}(\mathbf{x}) }\\	
			&\leq o(1)\underset{H \in \nmultig{s}{t,u,u,0,0}}{\sum} L_{\Ho}L_{\Ht}\\
			&\leq o(1)\underset{H \in \nmultig{s}{t,u,u,0,0}}{\sum} \E \Brac{\mathbf{Y}_{\Ho}}\E \Brac{\mathbf{Y}_{\Ht}}\,.
		\end{align*}
		Notice now that $S_{3,\ell}$ is empty unless $\ell \geq 9t/\sqrt{A}$. Moreover by choice of $A$, it holds $\Paren{1+\delta}^{-9t/\sqrt{A}}n^{10/A}\leq \Paren{1+\delta}^{-t/\sqrt{A}}$.
		%if
		%Notice now that if $H\in \nmultig{s}{t,u,u}^*\setminus \nmultig{s}{t,u,u}^{**}$ then
		%$H\in \nmultig{s}{t,u,u,\ell}$ for some $\ell\geq 9t/\sqrt{A}$ 
		It follows that by \cref{lem:second-moment-counting-nice-multigraphs} and  \cref{lem:sbm-lower-bound-nice-bsaws}
		\begin{align*}
			\underset{\ell \leq st}{\sum}\quad \underset{H \in S_{3,\ell}}{\sum}\E \Brac{\overline{U}_H(\mathbf{x}) }%\E \Brac{\mathbf{Y}_{H} }\Brac{\overline{U}_H(\mathbf{x}) }
			%&\leq \underset{\ell \leq st}{\sum}\quad \underset{H \in S_{3,\ell}}{\sum}\E \Brac{\overline{U}_H(\mathbf{x}) }\\
			&\leq \underset{\ell \leq st}{\sum}\frac{(1+\delta)^{-\ell}}{n^{1/50A}}n^{\ell} \underset{H \in S_{3,\ell}}{\sum}\E \Brac{\overline{U}_{\Ho}(\mathbf{x}) }\E \Brac{\overline{U}_{\Ht}(\mathbf{x}) }\\
			&\leq o(1)\underset{H \in \nmultig{s}{t,u,u,0,0}}{\sum} \E \Brac{\mathbf{Y}_{\Ho}}\E \Brac{\mathbf{Y}_{\Ht}}\,.
		\end{align*}
		It remains to bound the contribution of nice multigraphs in $S_{4,\ell}$. We first consider the case $\ell<st$. By \cref{lem:second-moment-counting-nice-multigraphs}
		\begin{align*}
			\underset{\ell < st}{\sum}\quad \underset{H \in S_{4,\ell}}{\sum}\E \Brac{\mathbf{Y}_{H} } &\leq  \Paren{1+(1+\delta)^{-s}}\underset{\ell < st}{\sum}\quad \underset{H \in S_{4,\ell}}{\sum}\Paren{1+\frac{\delta}{10}}^{-\ell}n^{\ell}\E \Brac{\mathbf{Y}_{\Ho} } \E \Brac{\mathbf{Y}_{\Ht} } \\
			&\leq C \underset{H \in S_{4,0}}{\sum}\E \Brac{\mathbf{Y}_{\Ho} } \E \Brac{\mathbf{Y}_{\Ht} }
		\end{align*}
		where $C>1$ is a universal constant.
		For $\ell= st$, similarly we get
		\begin{align*}
			\underset{H \in S_{4,st}}{\sum}\E \Brac{\mathbf{Y}_{H} }  &\leq (1+(1+\delta)^{-s})\underset{H \in S_{4,st}}{\sum} \Paren{1+\frac{\delta}{10}}^{-st}n^{st}\E \Brac{\mathbf{Y}_{\Ho} } \E \Brac{\mathbf{Y}_{\Ht} }\\
			&\leq  n\cdot (st)^{10}\cdot \Paren{1+\frac{\delta}{10}}^{-st}\underset{H\in S_{4,0}}{\sum} \E \Brac{\mathbf{Y}_{\Ho} } \E \Brac{\mathbf{Y}_{\Ht} }\\
			&\leq o(1)\underset{H\in S_{4,0}}{\sum} \E \Brac{\mathbf{Y}_{\Ho} } \E \Brac{\mathbf{Y}_{\Ht} }\,.
		\end{align*}
		Putting things together
		\begin{align*}
				\E \Brac{\Paren{\Q(\overline{\mathbf{Y}})^t}_{uu}\Paren{\Q(\overline{\mathbf{Y}})^t}_{uu}}&\leq C'\cdot \E \Brac{\Paren{\Q(\overline{\mathbf{Y}})^t}_{uu}}^2
		\end{align*}
		for a universal constant $C'>1$.
	\end{proof}
\end{lemma}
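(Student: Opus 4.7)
The final statement is the variance-style bound $\E[(Q^{(s)}(\overline{\mathbf{Y}})^t)_{uu}^2] \leq C\cdot \E[(Q^{(s)}(\overline{\mathbf{Y}})^t)_{uu}]^2$ for a universal constant $C$. The plan is to expand the left-hand side as a sum over pairs of block self-avoiding walks rooted at $u$,
\[
\E\bigl[(Q^{(s)}(\overline{\mathbf{Y}})^t)_{uu}^2\bigr] = \Paren{1\pm o(1)}\cdot n^{2t}\Paren{\frac{2}{\epsilon d}}^{2st}\sum_{H \in \bsaw{s}{t,u}\times \bsaw{s}{t,u}} \E[\overline{\mathbf{Y}}_H]\,,
\]
and then restrict, via the already proved \cref{lem:sbm-upperbound-negligible-multigraphs}, to the collection $\nmultig{s,t}{u,u}$ of nice multigraphs at negligible cost. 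On the right-hand side, \cref{lem:sbm-lower-bound-nice-bsaws} identifies $\sum_{H \in \nmultig{s,t}{u,u,0,0}} \E[\mathbf{Y}_{\Ho}]\E[\mathbf{Y}_{\Ht}]$ as an appropriate benchmark, comparable up to constants to $\E[(Q^{(s)}(\overline{\mathbf{Y}})^t)_{uu}]^2$.

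Within the nice collection I will stratify by $\ell := |E(\Ho)\cap E(\Ht)|$, the number of edges shared by the two decomposition walks, and simultaneously by three orthogonal bad attributes. This yields four classes: $S_{1,\ell}$ (some vertex of degree exceeding $\Delta$), $S_{2,\ell}$ (one of $\Ho,\Ht$ has fewer than $t/\sqrt{A}$ multiplicity-$1$ edges), $S_{3,\ell}$ (boundary multigraphs in $\nmultig{s,t}{u,u}^*\setminus\nmultig{s,t}{u,u}^{**}$), and the principal class $S_{4,\ell}:=\nmultig{s,t}{u,u,\ell,d^H\leq\Delta}^{**}$. The first three inherit negligibility from the counting machinery already developed for \cref{thm:truncation-schatten-norm-lower-bound}: pairing \cref{lem:count-nice-walks-large-degree} with \cref{lem:sbm-upper-bound-nice-bsaws-large-degree} yields an $n^{-1/13}$ prefactor for $S_1$; pairing \cref{lem:count-nice-walks-few-edges-multiplicity-1} with \cref{lem:sbm-upper-bound-nice-bsaws-few-edges} yields a $(1+\delta)^{-\Omega(t\sqrt{s})}$ prefactor for $S_2$; and a parallel argument handles $S_3$. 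Each contribution is $o(1)$ times the benchmark.

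The heart of the argument is the principal class $S_{4,\ell}$. Here I combine the variance bound \cref{lem:variance-nice-multigraphs-same-pivot},
\[
\E[\mathbf{Y}_H]\leq\bigl(1+(1+\delta)^{-s}\bigr)\cdot (1+\tfrac{\delta}{10})^{-\ell}\cdot\E[\mathbf{Y}_{\Ho}]\E[\mathbf{Y}_{\Ht}]\,,
\]
with the counting bound \cref{lem:second-moment-counting-nice-multigraphs}, namely $|\nmultig{s,t}{u,u,\ell}|\leq \ell^{10}n^{-\ell}\,|\nmultig{s,t}{u,u,0}|$ for $\ell<st$. The decisive cancellation is that each shared multiplicity-$2$ edge contributes a factor of roughly $n/(1+\delta)$ to $\E[\mathbf{Y}_{\Ho}]\E[\mathbf{Y}_{\Ht}]$ relative to the disjoint regime (since $\epsilon^2d/4=1+\delta$), and the counting factor $n^{-\ell}$ exactly absorbs this $n^\ell$ scaling. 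What remains after summing over $0\leq \ell<st$ is the geometric series $\sum_{\ell\geq 0}(1+\delta/10)^{-\ell}\ell^{10}$, which is bounded by a universal constant times the $\ell=0$ benchmark.

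The main obstacle I expect is the boundary case $\ell=st$ in which the two walks coincide entirely: \cref{lem:second-moment-counting-nice-multigraphs} supplies only a $4/n^{st-1}$ factor rather than $n^{-st}$, so one full power of $n$ is apparently missing. The rescue is the very strong decay $(1+\delta/10)^{-st}$ combined with our choice of $s$, which ensures $n\cdot (st)^{10}\cdot (1+\delta/10)^{-st}=o(1)$, so the $\ell=st$ contribution is swallowed. Summing the $O(1)$ contribution from $S_{4,\bullet}$ with the $o(1)$ contributions from $S_1,S_2,S_3$ and invoking \cref{lem:sbm-lower-bound-nice-bsaws} to identify $\sum_{H \in \nmultig{s,t}{u,u,0,0}} \E[\mathbf{Y}_{\Ho}]\E[\mathbf{Y}_{\Ht}]$ with a constant times $\E[(Q^{(s)}(\overline{\mathbf{Y}})^t)_{uu}]^2$ delivers the desired universal constant $C$.
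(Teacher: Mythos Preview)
Your proposal is correct and follows essentially the same approach as the paper: the same four-way split $S_{1,\ell},\ldots,S_{4,\ell}$ of the nice multigraphs stratified by $\ell=|E(\Ho)\cap E(\Ht)|$, with the same counting and variance lemmas driving the cancellation in $S_4$ and the same treatment of the $\ell=st$ boundary case. The only cosmetic difference is that for $S_{1,\ell}$ the paper invokes the second clause of \cref{lem:variance-nice-multigraphs-same-pivot} directly (which packages the $n^{-1/13}$ factor for pairs), whereas you cite the single-walk analogues \cref{lem:count-nice-walks-large-degree} and \cref{lem:sbm-upper-bound-nice-bsaws-large-degree}; the underlying mechanism is identical.
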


% BIBLIOGRAPHY

% assumes hyperref
\phantomsection
\addcontentsline{toc}{section}{References}
\bibliographystyle{amsalpha}
\bibliography{bib/mathreview,bib/dblp,bib/custom,bib/scholar}

\appendix

% APPENDIX

\section*{Organization of appendices}
The appendices are organized as follows.
In \cref{sec:appendix-lower-bound-non-centered} we introduce our techniques for computing the truncated expectation of block self-avoiding walks. and prove \cref{lem:sbm-upper-bound-any-bsaw}, \cref{lem:sbm-lower-bound-nice-bsaws}, \cref{lem:sbm-upper-bound-nice-bsaws-large-degree}, \cref{lem:variance-nice-multigraphs-different-pivot} and \cref{lem:variance-nice-multigraphs-same-pivot}.

In \cref{sec:appendix-upper-bound-centered} we study the expectation of polynomials arising in the computation of the centered Schatten norm and prove \cref{lem:sbm-centered-upper-bound-any-bsaw}.
Most of the technical details of \cref{sec:appendix-lower-bound-non-centered} and \cref{sec:appendix-upper-bound-centered}  can be found in \cref{sec:technical-lemmas-trace-bounds}.
In \cref{sec:splitting-expectations} we prove \Cref{fact:sbm-upperbound-removing-cycles}, \Cref{fact:splitting-upper-bound} and  \cref{fact:splitting-upper-bound-second-moment}.
The counting arguments required in \cref{sec:bounds-moments-Q} are proved in \cref{sec:counting-bsaw}. Specifically, we prove here \cref{lem:sbm-encoding-of-multigraph-with-B},  \cref{lem:count-nice-walks-large-degree}, \cref{lem:count-nice-walks-few-edges-multiplicity-1}, \cref{lem:sbm-counting-product-bsaws}, \cref{lem:second-moment-counting-nice-multigraphs}, \cref{lem:nice-bsaw-upper-bound-m-z} and \cref{lem:counting-special-nbsaw-centered}.
Finally, \cref{sec:appendix-additional-tools} contains additional basic tools that are used throughout the paper.

To help the reader navigate the sections we provide here a table where the various proofs of each lemma can be found.
\begin{center}
	\begin{tabular}{ |m{20em}|m{20em}| } 
		\hline
		\multicolumn{2}{|c|}{Statement A \quad is implied by  \quad Statement  B}\\
		\hline
		\hline
		\cref{lem:sbm-upper-bound-any-bsaw}& \cref{lem:lem_UH} \\ \hline
		\cref{fact:sbm-upperbound-removing-cycles} & \cref{fact:sbm-upperbound-removing-cycles-appendix}\\ \hline
		\Cref{fact:splitting-upper-bound} & \Cref{fact:splitting-upper-bound-appendix}\\ \hline
		\cref{lem:sbm-encoding-of-multigraph-with-B} & \cref{lem:sbm-encoding-of-H-B-equivalent} \\ \hline
		\cref{lem:sbm-lower-bound-nice-bsaws}  & \cref{lem:lem_upper_bound_nice_Wc}\\ \hline
		\cref{lem:sbm-upper-bound-nice-bsaws-large-degree} &  \cref{lem:lem_upper_bound_nice_Wc} \\ \hline
		\cref{lem:count-nice-walks-large-degree} & \cref{lem:count-nice-walks-large-degree-appendix} \\ \hline
		\cref{lem:count-nice-walks-few-edges-multiplicity-1} & \cref{lem:count-nice-walks-few-edges-multiplicity-1-appendix} \\ \hline
		\cref{lem:sbm-centered-upper-bound-any-bsaw} &  \cref{lem:lem_upper_bound_UhatH} \\ \hline
		\cref{lem:nice-bsaw-upper-bound-m-z} & \cref{lem:nice-bsaw-upper-bound} \\ \hline
		\cref{lem:counting-special-nbsaw-centered} & \cref{lem:counting-special-nbsaw-centered-appendix}  \\ \hline
		\cref{lem:sbm-counting-product-bsaws} & \cref{lem:sbm-counting-product-bsaws-appendix} \\ \hline
		\cref{fact:splitting-upper-bound-second-moment} & \cref{fact:splitting-upper-bound-second-moment-appendix}\\ \hline
		\cref{lem:variance-nice-multigraphs-different-pivot} &  \cref{lem:lem_E_Y_H_Product_nbsaw_disjoint} \\ \hline
		\cref{lem:variance-nice-multigraphs-same-pivot} &  \cref{lem:lem_E_Y_H_Product_nbsaw_sharing} \\ \hline
		 \cref{lem:second-moment-counting-nice-multigraphs}  &  \cref{lem:second-moment-counting-nice-multigraphs-appendix} \\ \hline
	\end{tabular}
\end{center}

\section{Bounds for the non-centered matrix}\label{sec:appendix-lower-bound-non-centered}
%To remove if we change section 6
\renewcommand{\nbsaw}[2]{{\text{NBSAW}_{#1,#2}^*}}
\subsection{Useful notation}
\label{subsec:subsec_notation_truncation}
Let $\Delta$ be the constant truncation threshold (which we will specify later), and let $\overline{\mathbf{G}}$ be the graph obtained from $\mathbf{G}\sim\SBM_n(d,\e)$ by deleting every vertex that has a degree strictly greater than $\Delta$ in $\mathbf{G}$.

For every $v\in\mathbf{G}$, let $\mathcal{E}_v$ be the event that $v$ is truncated, i.e., $$\mathcal{E}_v:=\big\{v\notin\overline{\mathbf{G}}\big\}=\{d_{\mathbf{G}}(v)>\Delta\}.$$

For every $V\subseteq [n]=V(\mathbf{G})$, we say that $V$ is truncated if there exists at least one vertex in $V$ that is truncated. Let $\mathcal{E}_V$ be the event that $V$ is truncated, i.e., $$\displaystyle\mathcal{E}_V:=\bigcup_{v\in V}\mathcal{E}_v=\big\{V\subsetneq V\big(\overline{\mathbf{G}}\big)\big\}.$$ Similarly, for every multigraph $H$ with $V(H)\subseteq [n]$, we say that $H$ is truncated if there exists at least one vertex in $H$ that is truncated. Let $\mathcal{E}_H$ be the event that $H$ is truncated, i.e.,  $$\displaystyle\mathcal{E}_H:=\mathcal{E}_{V(H)}=\big\{V(H)\subsetneq V\big(\overline{\mathbf{G}}\big)\big\}.$$

Recall that for every $i,j\in [n]$, we have 
$$\mathbf{Y}_{ij}=\begin{cases}1-\frac{d}{n}\quad&\text{if }ij\in\mathbf{G},\\-\frac{d}{n}\quad&\text{otherwise}.\end{cases}$$

We define $\overline{\mathbf{Y}}_{ij}\in\left\{-\frac{d}{n},0,1-\frac{d}{n}\right\}$ as follows:
\begin{align*}
\overline{\mathbf{Y}}_{ij}&=\mathbf{Y}_{ij}\cdot \mathbbm{1}_{\mathcal{E}_i^c}\cdot \mathbbm{1}_{\mathcal{E}_j^c}=\mathbf{Y}_{ij}\cdot \mathbbm{1}_{\mathcal{E}_i^c\cap \mathcal{E}_j^c}=\mathbf{Y}_{ij}\cdot \mathbbm{1}_{(\mathcal{E}_i\cup\mathcal{E}_j)^c}.
\end{align*}

For every multigraph $H$, we write $m_H(ij)$ to denote the multiplicity in $H$ of an edge $ij\in E(H)$. Define

\begin{align*}
\overline{\mathbf{Y}}_H&=\prod_{ij\in E(H)}\overline{\mathbf{Y}}_{ij}^{m_H(ij)}=\prod_{ij\in E(H)}\left(\mathbf{Y}_{ij}^{m_H(ij)}\cdot\mathbbm{1}_{(\mathcal{E}_i\cup\mathcal{E}_j)^c}\right)\\
&=\left(\prod_{ij\in E(H)}\mathbf{Y}_{ij}^{m_H(ij)}\right)\cdot\mathbbm{1}_{\bigcap_{ij\in E(H)}(\mathcal{E}_i\cup\mathcal{E}_j)^c}=\mathbf{Y}_H\cdot\mathbbm{1}_{\left(\bigcup_{ij\in E(H)}(\mathcal{E}_i\cup\mathcal{E}_j)\right)^c}\\
&=\mathbf{Y}_H\cdot\mathbbm{1}_{\mathcal{E}_{V(H)}^c}=\mathbf{Y}_H\cdot\mathbbm{1}_{\mathcal{E}_{H}^c}=\begin{cases}
\mathbf{Y}_H\quad&\text{if }H\text{ is truncated,}\\
0\quad&\text{otherwise}.
\end{cases}
\end{align*}

\subsection{An upper bound for every multigraph}

\label{subsec:subsec_upper_bound_H}

In this section, we derive an upper bound $\overline{U}_H(\mathbf{x})$ on $\big|\mathbb{E}\big[\overline{\mathbf{Y}}_H\big|\mathbf{x}\big]\big|$ for every multigraph $H$ with at most $st$ vertices and at most $st$ multi-edges, where $t=K\cdot\log n$.

\subsubsection{Informal discussion and useful definitions}
\label{subsubsec:InformalDiscussion}
Before delving into the details of the upper bound $\overline{U}_H(\mathbf{x})$ and the lower bound $\overline{L}_H$ that we mentioned in \Cref{sec:proof-strategy}, let us first explain a few key observations that provide some intuition, and which clarifies our proof strategy. This section provides a road map for the proof of the bounds, and introduces notations and concepts that will be useful later.

Let $H$ be an arbitrary multigraph. Observe that in the non-truncated case, the computation of $\mathbb{E}[\mathbf{Y}_H|\mathbf{x}]$ is easy because the edges are conditionally mutually independent given $\mathbf{x}$. More precisely,
$$\mathbb{E}[\mathbf{Y}_H|\mathbf{x}]=\prod_{ij\in E(H)}\mathbb{E}\big[\mathbf{Y}_{ij}^{m_H(ij)}\big|\mathbf{x}\big].$$

Unfortunately, this is not the case for the truncated case: Since the presence of one edge can lead to the truncation of an incident edge, the random variables $\big(\overline{\mathbf{Y}}_{ij}\big)_{ij\in E(H)}$ are not conditionally mutually independent given $\mathbf{x}$. The dependencies between $\big(\overline{\mathbf{Y}}_{ij}\big)_{ij\in E(H)}$ makes the exact computation of $\mathbb{E}\big[\overline{\mathbf{Y}}_H\big|\mathbf{x}\big]$ very complicated.

Another level of complication is that, unlike the non-truncated case, $\big(\overline{\mathbf{Y}}_{ij}\big)_{ij\in E(H)}$ are not conditionally independent of $\mathbf{G}-G(H)$ given $\mathbf{x}$. More precisely, $\big(\overline{\mathbf{Y}}_{ij}\big)_{ij\in E(H)}$ depend on $\mathbf{G}-G(H)$ through $\big(d_{\mathbf{G}-G(H)}(v)\big)_{v\in V(H)}$, where
$$d_{\mathbf{G}-G(H)}(v)=\big|\big\{e\in\mathbf{G}:\;e\notin E(H)\text{ and }e\text{ is incident to }v\big\}\big|.$$
Furthermore, $\big(d_{\mathbf{G}-G(H)}(v)\big)_{v\in V(H)}$ are not conditionally mutually independent given $\mathbf{x}$ because of edges in $\big\{ij: i,j\in V(H)\text{ and }ij\notin E(H)\big\}$. We call such edges \emph{$H$-cross-edges}. Fortunately, if $V(H)$ contains at most $st$ vertices, we have very few $H$-cross-edges and their effect will be negligible. More precisely, with high probability, no $H$-cross-edge will be present in $\mathbf{G}$, and if we condition on this event and on $\mathbf{x}$, the random variables $\big(d_{\mathbf{G}-G(H)}(v)\big)_{v\in V(H)}$ become conditionally mutually independent. This will be made precise later.

Another phenomenon that we should be aware of, is the effect of truncation on $\mathbb{E}\big[\overline{\mathbf{Y}}_H\big|\mathbf{x},\big(d_{\mathbf{G}-G(H)}(v))_{v\in V(H)}\big]$. In order to explain this, we will divide the edges of $G(H)$ into two categories:
\begin{itemize}
\item Edges of multiplicity 1 in $H$.
\item Edges of multiplicity at least 2 in $H$.
\end{itemize}

Before discussing the effect of truncation on these two categories of edges, let us quickly mention a remark regarding some tempting approaches to upper and lower bound $\mathbb{E}\big[\overline{\mathbf{Y}}_H\big|\mathbf{x}\big]$, and which we did not find very successful.

\begin{remark}
Since $\overline{\mathbf{Y}}_H=\mathbf{Y}_H\cdot \mathbbm{1}_{\mathcal{E}_H^c}$, we have $\big|\overline{\mathbf{Y}}_H\big|=|\mathbf{Y}_H|\cdot \mathbbm{1}_{\mathcal{E}_H^c}\leq |\mathbf{Y}_H|$. Therefore,
$$\big|\mathbb{E}\big[\overline{\mathbf{Y}}_H\big|\mathbf{x}\big]\big|\leq \mathbb{E}\big[|\overline{\mathbf{Y}}_H|\big|\mathbf{x}\big]\leq \mathbb{E}\big[|\mathbf{Y}_H|\big|\mathbf{x}\big].$$
We might hope to get a good upper bound for $\big|\mathbb{E}\big[\overline{\mathbf{Y}}_H\big|\mathbf{x}\big]\big|$ using the above inequality since $\displaystyle \mathbb{E}\big[|\mathbf{Y}_H|\big|\mathbf{x}\big]= \prod_{ij\in E(H)}\mathbb{E}\big[|\mathbf{Y}_{ij}^{m_H(ij)}|\big|\mathbf{x}\big]$ is easy to compute. However, $\mathbb{E}\big[|\mathbf{Y}_H|\big|\mathbf{x}\big]$ can be too large compared to $\mathbb{E}\big[\mathbf{Y}_H\big|\mathbf{x}\big]$. This is mainly because of edges of multiplicity 1 in $H$: If $ij\in E(H)$ is of multiplicity 1 in $H$, then
$$|\mathbb{E}[\mathbf{Y}_{ij}|\mathbf{x}]|=\frac{\epsilon d}{2n},$$
while
$$\mathbb{E}\big[|\mathbf{Y}_{ij}|\big|\mathbf{x}\big]=(1+o(1))\left(2+\frac{\epsilon \mathbf{x}_i\mathbf{x}_j}{2}\right)\frac{d}{n}.$$

Combining this observation with the fact that $E(H)$ can contain $\Theta(\log n)$ edges of multiplicity 1, it becomes evident that in general, $\mathbb{E}\big[|\mathbf{Y}_H|\big|\mathbf{x}\big]$ is not a good upper bound on $\big|\mathbb{E}\big[\overline{\mathbf{Y}}_H\big|\mathbf{x}\big]\big|$.

Other in-the-same-spirit approaches attempt to separately analyze the cases when $\overline{\mathbf{Y}}_H$ is positive or negative, i.e., write $\overline{\mathbf{Y}}_H=\overline{\mathbf{Y}}_H^+-\overline{\mathbf{Y}}_H^-$, where $\overline{\mathbf{Y}}_H^+=\max\{0,\overline{\mathbf{Y}}_H\}$ and $\overline{\mathbf{Y}}_H^-=\max\{0,-\overline{\mathbf{Y}}_H\}$. One might hope to obtain good bounds on $\mathbb{E}\big[\overline{\mathbf{Y}}_H\big|\mathbf{x}\big]=\mathbb{E}\big[\overline{\mathbf{Y}}_H^+\big|\mathbf{x}\big]-\mathbb{E}\big[\overline{\mathbf{Y}}_H^-\big|\mathbf{x}\big]$ by obtaining good upper and lower bounds on $\mathbb{E}\big[\overline{\mathbf{Y}}_H^+\big|\mathbf{x}\big]$ and $\mathbb{E}\big[\overline{\mathbf{Y}}_H^-\big|\mathbf{x}\big]$, respectively. However, it turns out that $\mathbb{E}\big[\overline{\mathbf{Y}}_H^+\big|\mathbf{x}\big]$ and $\mathbb{E}\big[\overline{\mathbf{Y}}_H^-\big|\mathbf{x}\big]$ can be too large compared to $\mathbb{E}\big[\overline{\mathbf{Y}}_H\big|\mathbf{x}\big]$. This is essentially for the same reason why $\mathbb{E}\big[|\mathbf{Y}_H|\big|\mathbf{x}\big]$ can be too large compared to $\big|\mathbb{E}\big[\mathbf{Y}_H\big|\mathbf{x}\big]\big|$. Therefore, in order for this approach to succeed in deriving good bounds for $\mathbb{E}\big[\overline{\mathbf{Y}}_H\big|\mathbf{x}\big]$, the bounds on $\mathbb{E}\big[\overline{\mathbf{Y}}_H^+\big|\mathbf{x}\big]$ and $\mathbb{E}\big[\overline{\mathbf{Y}}_H^-\big|\mathbf{x}\big]$ need to be extremely tight, and this is very hard to obtain for a general $H$.
\end{remark}

\vspace*{3mm}

\noindent {\bf Effect of truncation on edges of multiplicity 1}

\vspace*{2mm}

For edges of multiplicity 1, we observe that truncation can make the absolute value of $\mathbb{E}\big[\overline{\mathbf{Y}}_H\big|\mathbf{x},\big(d_{\mathbf{G}-G(H)}(v)\big)_{v\in V(H)}\big]$ grow too much. To illustrate this point, take the example where $H$ is a cycle with $st$ edges of multiplicity 1. For the non-truncated case, since $\big(d_{\mathbf{G}-G(H)}(v)\big)_{v\in V(H)}$ is conditionally independent from $\mathbf{Y}_H$ given $\mathbf{x}$, if we condition on the event that $d_{\mathbf{G}-G(H)}(v)=\Delta$ for every $v\in V(H)$, we get
$$\mathbb{E}\big[\mathbf{Y}_H\big|\mathbf{x},\big\{d_{\mathbf{G}-G(H)}(v)=\Delta,\forall v\in V(H)\big\}\big]=\prod_{uv\in E(H)}\frac{\epsilon d\mathbf{x}_u\mathbf{x}_v}{2n}=\left(\frac{\epsilon d}{2n}\right)^{st}.$$
On the other hand, for the truncated case, if we condition on the event that $d_{\mathbf{G}-G(H)}(v)=\Delta$ for every $v\in V(H)$, then $\overline{\mathbf{Y}}_H$ is non-zero if and only if all the edges in $E(H)$ are not present in $\mathbf{G}$, hence
\begin{align*}
\mathbb{E}\big[\overline{\mathbf{Y}}_H\big|\mathbf{x},\big\{d_{\mathbf{G}-G(H)}(v)=\Delta,\forall v\in V(H)\big\}\big]&=\prod_{ij\in E(H)}\left(1-\left(1+\frac{\epsilon \mathbf{x}_i\mathbf{x}_j}{2}\right)\frac{d}{n}\right)\left(-\frac{d}{n}\right)\\
&=(1\pm o(1))\left(-\frac{d}{n}\right)^{st}.
\end{align*}

Therefore, the absolute value was multiplied by a factor of approximately $\left(\frac{2}{\epsilon}\right)^{st}$, which can be too large. Fortunately, this can be mitigated by choosing $\Delta$ to be large enough so that problematic events such as $\big\{d_{\mathbf{G}-G(H)}(v)=\Delta,\forall v\in V(H)\big\}$ will have a very small probability.

Roughly speaking, the contribution of an edge of multiplicity 1 in the non-truncated case is $|\mathbb{E}[\mathbf{Y}_{ij}|\mathbf{x}]|=\frac{\epsilon d}{2n}$, and its contribution in the truncated case can be as large as $\frac{d}{n}$. Since we are trying to make our upper bound as tight as possible --- i.e., as low as possible --- we would like to find situations where edges of multiplicity 1 behave similarly to the non-truncated case, because their contribution will be relatively small.

This brings us to the following observation: Assume that $ij\in E(H)$ is an edge of multiplicity 1 such that $d_{\mathbf{G}-G(H)}(i)+d_{G(H)}(i)\leq \Delta$ and $d_{\mathbf{G}-G(H)}(j)+d_{G(H)}(j)\leq \Delta$, where $d_{G(H)}(i)$ denotes the degree of $i$ in $H$ without counting multiplicities, i.e., $d_{G(H)}(i)$ is the degree of $i$ in the underlying graph $G(H)$. In this case, we have
$$d_{\mathbf{G}}(i)=d_{\mathbf{G}-G(H)}(i)+d_{\mathbf{G}\cap G(H)}(i)\leq d_{\mathbf{G}-G(H)}(i)+d_{G(H)}(i)\leq \Delta.$$
Similarly, we have $d_{\mathbf{G}}(j)\leq \Delta$. Therefore, no matter which edges of $E(H)$ are present in $\mathbf{G}$ and which are absent, we are sure that neither $i$ nor $j$ will be deleted, and so $\overline{\mathbf{Y}}_{ij}=\mathbf{Y}_{ij}$. Furthermore, it is easy to see that given $\mathbf{x}$, and given that $d_{\mathbf{G}-G(H)}(i)+d_{G(H)}(i)\leq \Delta$ and $d_{\mathbf{G}-G(H)}(j)+d_{G(H)}(j)\leq \Delta$, the random variable $\overline{\mathbf{Y}}_{ij}=\mathbf{Y}_{ij}$ is conditionally independent from $\overline{\mathbf{Y}}_{H-ij}$. This motivates the following definition:
\begin{definition}
\label{def:def_safe_vertices}
Let $H$ be a multigraph. For every vertex $v\in V(H)$, if $d_{\mathbf{G}-G(H)}(v)+d_{G(H)}(v)\leq \Delta$, we say that $v$ is $(\mathbf{G},H)$-\emph{safe}. We say that $v$ is $(\mathbf{G},H)$-\emph{unsafe} if it is not $(\mathbf{G},H)$-\emph{safe}.

We say that a subset $S$ of $V(H)$ is \emph{completely} $(\mathbf{G},H)$-\emph{safe} if all the vertices in it are $(\mathbf{G},H)$-\emph{safe}. Similarly, we say that it is \emph{completely} $(\mathbf{G},H)$-\emph{unsafe} if all the vertices in it are $(\mathbf{G},H)$-\emph{unsafe}.

An edge $ij\in E(H)$ is said to be $(\mathbf{G},H)$-\emph{safe} if both $i$ and $j$ are safe.

If $\mathbf{G}$ and $H$ are clear from the context, we drop $(\mathbf{G},H)$ and simply write safe, completely safe, unsafe, and completely unsafe.
\end{definition}

In summary, we have the following observations:
\begin{itemize}
\item If a vertex $v\in V(H)$ is safe, we are sure that it will not be deleted. On the other hand, if $v$ is unsafe, it may or may not be deleted.
\item If $ij\in E(H)$ is safe, then $\overline{\mathbf{Y}}_{ij}=\mathbf{Y}_{ij}$ and $\overline{\mathbf{Y}}_{ij}$ is conditionally independent of $\overline{\mathbf{Y}}_{H-ij}$ given $\mathbf{x}$. On the other hand, if $ij$ is unsafe, then $\overline{\mathbf{Y}}_{ij}$ may or may not be equal to $\mathbf{Y}_{ij}$.
\end{itemize}

If $ij\in E(H)$ is an edge of multiplicity 1 in $H$, then we have
\begin{align*}
\mathbb{E}\big[\overline{\mathbf{Y}}_H\big|\mathbf{x},\big\{ij\text{ is safe}\big\}\big]&=\mathbb{E}\big[\overline{\mathbf{Y}}_{ij}\big|\mathbf{x},\big\{ij\text{ is safe}\big\}\big]\cdot\mathbb{E}\big[\overline{\mathbf{Y}}_{H-ij}\big|\mathbf{x},\big\{ij\text{ is safe}\big\}\big]\\
&=\mathbb{E}\big[\mathbf{Y}_{ij}\big|\mathbf{x},\big\{ij\text{ is safe}\big\}\big]\cdot\mathbb{E}\big[\overline{\mathbf{Y}}_{H-ij}\big|\mathbf{x},\big\{ij\text{ is safe}\big\}\big]\\
&=\mathbb{E}[\mathbf{Y}_{ij}|\mathbf{x}]\cdot \mathbb{E}\big[\overline{\mathbf{Y}}_{H-ij}\big|\mathbf{x},\big\{ij\text{ is safe}\big\}\big]\\
&=\frac{\epsilon \mathbf{x}_i\mathbf{x}_j d}{2n}\cdot \mathbb{E}\big[\overline{\mathbf{Y}}_{H-ij}\big|\mathbf{x},\big\{ij\text{ is safe}\big\}\big].
\end{align*}

As we can see from the above discussion, edges of multiplicity 1 that are safe behave similarly to the non-truncated case. In order to benefit from this phenomenon as much as possible, we would like the probability that an edge $ij$ is unsafe to be small. Equivalently, we would like the probability that $i$ (respectively $j$) is unsafe to be small. By analyzing the event $\big\{d_{\mathbf{G}-G(H)}(i)+d_{G(H)}(i)> \Delta\big\}$, we notice the following:
\begin{itemize}
\item The distribution of the random variable $d_{\mathbf{G}-G(H)}(i)$ is $\text{Binomial}\left(n-O(st),\frac{d}{n}\right)$,\footnote{Note that here we are not conditioning on $\mathbf{x}$.} so we can approximate it by a  $\text{Poisson}(d)$ distribution.
\item The probability of the event $\big\{d_{\mathbf{G}-G(H)}(i)+d_{G(H)}(i)> \Delta\big\}$ is lower bounded by the probability that $d_{\mathbf{G}-G(H)}(i)> \Delta$. Now since $\Delta$ must be a constant\footnote{Recall that the main motivation behind truncation is to get rid of the effect of large degree vertices, and so $\Delta$ must be a constant and cannot diverge with $n$. Nevertheless, we can tune $\Delta$ and make it as large as we wish, as long as it stays polynomial in $s$ and $d$, and polylogarithmic in $\epsilon$.} that should not scale with $n$, it is easy to see that the probability that $i$ is unsafe is lower bounded by a strictly positive constant\footnote{This is because the probability that $\text{Poisson}(d)>\Delta$ is constant.} and cannot be made go to zero as $n$ goes to infinity. Nevertheless, the larger the value of $\Delta-d_{G(H)}(i)$ is, the smaller is the probability that $i$ is unsafe. In particular, if $\frac{\Delta}{2}$ is very large with respect to $d$ and $d_{G(H)}(i)\leq\frac{\Delta}{2}$, then the probability that $i$ is unsafe is at most the probability that $d_{\mathbf{G}-G(H)}(i)> \frac{\Delta}{2}$, which can be approximated by the probability that a $\text{Poisson}(d)$ random variable is greater than $\frac{\Delta}2$, which in turn is small as long as $\frac{\Delta}{2}$ is large with respect to $d$.
\end{itemize}
We conclude that vertices satisfying $d_{G(H)}(i)\leq\frac{\Delta}{2}$ are "well-behaved", and edges whose end-vertices are both of this type are also "well-behaved" in the sense that with large probability they will be safe and will behave similarly to the non-truncated case.

For edges $ij\in E(H)$ of multiplicity 1 in $H$ that are not well-behaved, i.e., $d_{G(H)}(i)>\frac{\Delta}{2}$ or $d_{G(H)}(j)>\frac{\Delta}{2}$, we will use a very loose\footnote{Essentially, we will upper bound $\big|\overline{\mathbf{Y}}_{ij}\big|$ by $|\mathbf{Y}_{ij}|$.} upper-bound to estimate the contribution of $\overline{\mathbf{Y}}_{ij}$. Nevertheless, we will mitigate the effect of such edges depending on what caused them to be not well-behaved. In principal, a vertex $v\in V(H)$ can have a large degree in $H$, i.e., $d_{G(H)}(v)>\frac{\Delta}{2}$, either because it has many edges of multiplicity 1 in $H$ which are incident to it, or because it has many edges of multiplicity at least 2 in $H$ which are incident to it. We will treat each case differently. This motivates the following definition:

\begin{definition}
Let $H$ be an arbitrary multigraph. For every $v\in V(H)$, we define the following:
\begin{itemize}
\item The 1-degree of $v$ in $H$, denoted $d^H_{1}(v)$ is the number of edges in $E(H)$ that are incident to $v$, and which have multiplicity 1 in $H$.
\item An edge in $E(H)$ is said to be of multiplicity $\geq\hspace*{-1.2mm}2$ in $H$ if it has multiplicity at least 2 in $H$. The $(\geq\hspace*{-1.2mm}2)$-degree of $v$ in $H$, denoted $d^H_{\geq 2}(v)$ is the number of edges in $E(H)$ that are incident to $v$, and which have multiplicity $\geq\hspace*{-1.2mm}2$ in $H$.
\end{itemize}
Clearly, $d_{G(H)}(v)=d^H_{1}(v)+d^H_{\geq 2}(v)$ is the degree of $v$ in the underlying graph $G(H)$, i.e., the degree of $v$ in $H$ without counting multiplicities.
\end{definition}

Roughly speaking, we will mitigate the effects of "not-well-behaved" edges using the following ideas:
\begin{itemize}
\item We will show that there are very few block self-avoiding-walks that have vertices with large 1-degree, and this will counter-act the loose upper bounds that we use for the edges of multiplicity 1 that are incident to such vertices.
\item Assuming that $H$ is a multigraph that does not contain any vertex with large 1-degree, we will show that there are very few vertices with large $(\geq\hspace*{-1.2mm}2)$-degree. Therefore, the number of edges of multiplicity 1  that are incident to such vertices is not large, and so the aggregate effect of the loose upper bounds that we use for such edges is not too severe.
\end{itemize}

\vspace*{3mm}

\noindent {\bf Effect of truncation on edges of multiplicity $\geq\hspace*{-1.2mm}2$}

\vspace*{2mm}

Compared to edges of multiplicity 1 in $H$, edges of multiplicity $\geq\hspace*{-1.2mm}2$ in $H$ are much easier to treat: Unlike edges of multiplicity 1 in $H$, for an edge $ij$ of multiplicity $m_H(ij)\geq 2$, the conditional expectations $\mathbb{E}\big[\mathbf{Y}_{ij}^{m_H(ij)}\big|\mathbf{x}\big]$ and $\mathbb{E}\big[|\mathbf{Y}_{ij}^{m_H(ij)}|\big|\mathbf{x}\big]$ are approximately equal. Therefore, for many cases of interest, $\big|\mathbf{Y}_{ij}^{m_H(ij)}\big|$ is a good upper bound on $\overline{\mathbf{Y}}_{ij}^{m_H(ij)}$.

If $m_H(ij)\geq 2$, we have the following:
\begin{itemize}
\item If $ij\in \mathbf{G}$, then
$$\big|\mathbf{Y}_{ij}^{m_H(ij)}\cdot \mathbb{P}[ij\in\mathbf{G}]\big|=\left(1-\frac{d}{n}\right)^{m_H(ij)}\cdot\left(1+\frac{\epsilon \mathbf{x}_i\mathbf{x}_j}{2}\right)\frac{d}{n}=\Omega\left(\frac{1}{n}\right).$$
\item If $ij\notin \mathbf{G}$, then
$$\big|\mathbf{Y}_{ij}^{m_H(ij)}\cdot\mathbb{P}[ij\notin\mathbf{G}]\big|=\left|\left(-\frac{d}{n}\right)^{m_H(ij)}\cdot\left(1-\left(1+\frac{\epsilon \mathbf{x}_i\mathbf{x}_j}{2}\right)\frac{d}{n}\right)\right|=O\left(\frac{1}{n^{m_H(ij)}}\right).$$
\end{itemize}
As we can see, the contribution of the case when $ij\in\mathbf{G}$ dominates the contribution of the case when $ij\notin\mathbf{G}$. Therefore, when we want to compute $\mathbb{E}\big[\mathbf{Y}_{ij}^{m_H(ij)}\big|\mathbf{x}\big]$, or $\mathbb{E}\big[|\mathbf{Y}_{ij}^{m_H(ij)}|\big|\mathbf{x}\big]$, we can discard the case for which $ij\notin\mathbf{G}$. This is essentially the main reason why $\mathbb{E}\big[\mathbf{Y}_{ij}^{m_H(ij)}\big|\mathbf{x}\big]$ and $\mathbb{E}\big[|\mathbf{Y}_{ij}^{m_H(ij)}|\big|\mathbf{x}\big]$ are approximately equal.

A similar phenomenon occurs for the truncated case, except that the contribution of the case for which $ij\in\mathbf{G}$ might be multiplied by zero due to truncation. Therefore, roughly speaking, when we want to compute $\mathbb{E}\big[\overline{\mathbf{Y}}_{ij}^{m_H(ij)}\big|\mathbf{x}\big]$, it is sufficient to consider the cases where the number of edges of multiplicity $\geq\hspace*{-1.2mm}2$ in $H$ that are present in $\mathbf{G}$ is as large as possible.\footnote{To be more precise, we should give priority to edges with larger multiplicities. For example, if we have a multigraph where the multiplicities are 2, 3 and 4, we first try to include as many edges of multiplicity 4 as possible without causing truncation, then, if there is still room to include edges of multiplicity 3, we include as many of them as possible before turning to edges of multiplicity 2.}

Now notice that if there is a vertex $v$ with $d^H_{\geq 2}(v)>\Delta$, it is impossible to include all the edges of multiplicity $\geq\hspace*{-1.2mm}2$ in $H$ that are incident to $v$ without causing truncation. Therefore, roughly speaking, $\big|\mathbb{E}\big[\overline{\mathbf{Y}}_H\big|\mathbf{x}\big]\big|$ will be at least $O\left(\frac{1}{n}\right)$ smaller compared to $|\mathbb{E}[\mathbf{Y}_H|\mathbf{x}]|$. This will allow us to show that the aggregate contribution of all multigraphs having at least one vertex $v$ satisfying $d^H_{\geq 2}(v)>\Delta$ is negligible.\footnote{This is the main reason why the algorithm that is based on $Q^{(s)}\big(\overline{\mathbf{Y}}\big)$ works whereas the one based on $Q^{(s)}(\mathbf{Y})$ does not work: When we write $\mathbb{E}[\Tr((Q^{(s)}(\mathbf{Y})-\mathbf{x}\transpose{\mathbf{x}})^t)]$ as the sum of contributions of block self-avoiding-walks, we find that there are too many block self-avoiding-walks where $d^H_{\geq 2}(v)>\Delta$ for some vertex $v\in V(H)$. This makes the value of $\|Q^{(s)}(\mathbf{Y})-\mathbf{x}\transpose{\mathbf{x}}\|_t$ too large. On the other hand, for the truncated case, even though we have too many such block self-avoiding-walks, the contribution of each one of them is very small.}

\subsubsection{The truncation threshold}

We choose the truncation threshold to be of the form
\begin{equation}
\label{eq:eq_Delta_form}
\Delta=\max\left\{128e^4d^4,40Asd, 2\log(2As)+ 12A\tau s^2\cdot \log 2 + 8A^2\tau^2s^2 \left(\log\frac{6}{\epsilon}\right)^2\right\},
\end{equation}
where 
\begin{equation}
\label{eq:eq_deg1_Threshold}
\tau=As \log\frac{6}{\epsilon},
\end{equation}
and $A> \max\{1,100K,\frac{100}{K}\}$ is some constant to be chosen later. This form of $\Delta$ was carefully chosen to allow for a proof of all the phenomena that were mentioned in \Cref{subsubsec:InformalDiscussion}. Before starting to prove the upper bound $\overline{U}_H(\mathbf{x})$ on $\big|\mathbb{E}\big[\overline{\mathbf{Y}}_H\big|\mathbf{x}\big]\big|$, we need to introduce a few definitions.

\begin{definition}
\label{def:def_deg1_classification}
Let $H$ be a multigraph. We classify the vertices $v\in V(H)$ according to their degree-$1$ as follows:
\begin{itemize}
\item If $d^H_{1}(v)\leq \tau$, we say that $v$ is $1$-\emph{small} in $H$. We denote the set of $1$-small vertices in $H$ as $\mathcal{S}_1(H)$.
\item If $d^H_{1}(v)>\tau$, we say that $v$ is $1$-\emph{large} in $H$. We denote the set of $1$-large vertices in $H$ as $\mathcal{L}_1(H)$.
\end{itemize}
\end{definition}

\begin{definition}
\label{def:def_deg2_classification}
Let $H$ be a multigraph. We classify the vertices $v\in V(H)$ according to their $(\geq\hspace*{-1.2mm}2)$-degree as follows:
\begin{itemize}
\item If $d^H_{\geq 2}(v)\leq \frac{\Delta}{4}$, we say that $v$ is $(\geq\hspace*{-1.2mm}2)$-\emph{small} in $H$. We denote the set of $(\geq\hspace*{-1.2mm}2)$-small vertices in $H$ as $\mathcal{S}_{\geq 2}(H)$.
\item If $\frac{\Delta}{4}<d^H_{\geq 2}(v)\leq \Delta$, we say that $v$ is $(\geq\hspace*{-1.2mm}2)$-\emph{intermediate} in $H$. We denote the set of $(\geq\hspace*{-1.2mm}2)$-intermediate vertices in $H$ as $\mathcal{I}_{\geq2}(H)$.
\item If $d^H_{\geq 2}(v)>\Delta$, we say that $v$ is $(\geq\hspace*{-1.2mm}2)$-\emph{large} in $H$. We denote the set of $(\geq\hspace*{-1.2mm}2)$-large vertices in $H$ as $\mathcal{L}_{\geq2}(H)$.
\end{itemize}
\end{definition}

\begin{definition}
\label{def:def_E1_E2_annoying}
Let $H$ be a multigraph. We denote the set of edges in $G(H)$ of multiplicity 1 in $H$ as $E_1(H)$, and denote the set of edges in $G(H)$ of multiplicity $\geq\hspace*{-1.2mm}2$ in $H$ as $E_{\geq 2}(H)$.

An edge in $E_1(H)$ is said to be \emph{annoying} if it is incident to at least one vertex in $\mathcal{L}_1(H)$. We denote the set of annoying edges as $E_1^a(H)$.

We partition $E_{\geq 2}(H)$ into two sets:
$$E_{\geq2}^a(H)=\big\{uv\in E_{\geq 2}(H): u\notin \mathcal{L}_{\geq2}(H)\text{ and }v\notin \mathcal{L}_{\geq2}(H)\big\},$$
and
$$E_{\geq2}^b(H)=\big\{uv\in E_{\geq 2}(H): u\in \mathcal{L}_{\geq2}(H)\text{ or }v\in \mathcal{L}_{\geq2}(H)\big\}.$$
\end{definition}

\begin{definition}
\label{def:def_upper_bound_UH}
For every multigraph $H$ with at most $st=sK\log n$ vertices and at most $st$ multi-edges, we define the quantity
$$\overline{U}_H(\mathbf{x})=\resizebox{0.87\textwidth}{!}{$\displaystyle n^{\frac{2K}{A}}\left(\frac{6}{\epsilon}\right)^{|E_1^a(H)|}\frac{1}{\resizebox{0.07\textwidth}{!}{$\displaystyle\prod_{v\in\mathcal{L}_{\geq2}(H)}$} n^{\frac{1}{4}\left(d^H_{\geq 2}(v)-\Delta\right)}}\left(\frac{\epsilon d}{2n}\right)^{|E_1(H)|}\left(\frac{d}{n}\right)^{|\E_{\geq2}(H)|}\prod_{uv\in E_{\geq2}^a(H)}\left[1+\frac{\epsilon \mathbf{x}_u\mathbf{x}_v}{2}+\frac{3d}{\sqrt{n}}\right].$}$$
\end{definition}

In the rest of \Cref{subsec:subsec_upper_bound_H}, we show that $\overline{U}_H(\mathbf{x})$ is an upper bound on $\big|\mathbb{E}\big[\overline{\mathbf{Y}}_H\big|\mathbf{x}\big]\big|$ for $n$ large enough, assuming that $H$ is a multigraph with at most $st=sK\log n$ vertices and at most $st$ multi-edges.

%Notice that the expression $\displaystyle\left(\frac{\epsilon d}{2n}\right)^{|E_1(H)|}\prod_{uv\in E_{\geq 2}(H)}\left[\left(1+\frac{\epsilon \mathbf{x}_u\mathbf{x}_v}{2}\right)\frac{d}{n}+\frac{d^2}{n^2}\right]$ is roughly the upper bound that we can get for $|\mathbb{E}[\mathbf{Y}_H|\mathbf{x}]|$ in the non-truncated case. Therefore, truncation has the effect of:
%\begin{itemize}
%\item An amplification by a factor $n^{\frac{2K}{A}}$.
%\item An amplification  by a factor of $\frac{6}{\epsilon}$ for every annoying edge.
%\item For every $uv\in E_{\geq2}^b(H)$, the factor $\left[\left(1+\frac{\epsilon \mathbf{x}_u\mathbf{x}_v}{2}\right)\frac{d}{n}+\frac{d^2}{n^2}\right]$ is replaced by $\frac{2d}{n}$.
%\item For every $uv\in E_{\geq2}^a(H)$, the factor $\left[\left(1+\frac{\epsilon \mathbf{x}_u\mathbf{x}_v}{2}\right)\frac{d}{n}+\frac{d^2}{n^2}\right]$ is replaced by $\left[\left(1+\frac{\epsilon \mathbf{x}_u\mathbf{x}_v}{2}\right)\frac{d}{n}+\frac{3d^2}{n\sqrt{n}}\right]$.
%\item A reduction by a factor of $n^{\frac{1}{4}(d^H_{\geq 2}(v)-\Delta)}$ for every $(\geq\hspace*{-1.2mm}2)$-large vertex $v$ in $H$.
%\end{itemize}

Note that with more refined calculations, it is possible to get a better upper bound. In any case, $\overline{U}_H(\mathbf{x})$ is good enough for our purposes.

\subsubsection{Analyzing edges of multiplicity 1}

We start by proving an upper bound on the probability that a subset of $\mathcal{S}_1(H)\cap\mathcal{S}_{\geq 2}(H)$ is completely unsafe. We first need to introduce some notation:
\begin{definition}
\label{def:def_in_out_degree}
An edge $ij$ satisfying $i,j\in V(H)$ and $ij\notin E(H)$ is called an $H$-\emph{cross-edge}.

For every $v\in V(H)$, define the following:
\begin{align*}
d_{\mathbf{G}-G(H)}(v)&=\big|\big\{e\in\mathbf{G}:\;e\text{ is incident to }v\text{ and }e\notin E(H)\big\}\big|\\
&=\big|\big\{uv\in\mathbf{G}:\;uv\notin E(H)\big\}\big|,
\end{align*}
$$d_{\mathbf{G}-G(H)}^i(v)=\big|\big\{uv\in\mathbf{G}:\;uv\notin E(H)\text{ and }u\in V(H)\big\}\big|,$$
and
$$d_{\mathbf{G}-G(H)}^o(v)=\big|\big\{uv\in\mathbf{G}:\;uv\notin E(H)\text{ and }u\notin V(H)\big\}\big|.$$
Clearly, $d_{\mathbf{G}-G(H)}(v)=d_{\mathbf{G}-G(H)}^i(v)+d_{\mathbf{G}-G(H)}^o(v)$.
\end{definition}

$d_{\mathbf{G}-G(H)}^o(v)$ can be thought of as the "$V(H)$-outside degree" of $v$ in $\mathbf{G}-G(H)$, i.e., the number of edges in $\mathbf{G}-G(H)$ that go from $v$ to the outside of $V(H)$. On the other hand, $d_{\mathbf{G}-G(H)}^i(v)$ can be thought of as the "$V(H)$-inside degree" of $v$ in $\mathbf{G}-G(H)$, i.e., the number of edges in $\mathbf{G}-G(H)$ that are incident to $v$ and are inside $V(H)$.

If a vertex $v\in V(H)$ is unsafe, then either $d_{\mathbf{G}-G(H)}^i(v)$ is large or $d_{\mathbf{G}-G(H)}^o(v)$ is large. It turns out that with high probability, we have\footnote{This is essentially because we have at most $s^2t^2=o(n)$ $H$-cross-edges, and the probability of any particular one of them being present in $\mathbf{G}$ is $O\left(\frac{1}{n}\right)$.} $d_{\mathbf{G}-G(H)}^i(v)=0$. Therefore, the probability that $v$ is unsafe is dominated by the probability that $d_{\mathbf{G}-G(H)}^o(v)> \Delta- d_{G(H)}(v)$.

Notice that if $v\in \mathcal{S}_1(H)\cap\mathcal{S}_{\geq 2}(H)$, then
$$d_{G(H)}(v)=d_1^H(v)+d_{\geq 2}^H(v)\leq \tau + \frac{\Delta}{4}\leq \frac{\Delta}{4} + \frac{\Delta}{4}= \frac{\Delta}{2}.$$

The following lemma derives an upper bound on the probability that the outside degree is larger than $\frac{\Delta}{2}$.

\begin{lemma}
\label{lem:lem_bound_prob_large_outside_deg}
Let $H$ be a multigraph such that $|V(H)|\leq st$, where $t=K\log n$. For every $v\in\mathcal{S}_1(H)\cap\mathcal{S}_{\geq 2}(H)$, we have
$$\mathbb{P}\left[d_{\mathbf{G}-G(H)}^o(v)>\frac{\Delta}{4}\middle| \mathbf{x}\right]\leq \frac{\eta}{2},$$
where\footnote{In the calculations for $\mathbb{E}\left[\Tr\left(Q^{(s)}\big(\overline{\mathbf{Y}}\big)^t\right)\right]$, we will need $\eta\leq\frac{1}{As}\left(\frac{\epsilon}{6}\right)^{\tau}$. On the other hand, in the calculations for $\mathbb{E}\left[\Tr\left(\big(Q^{(s)}\big(\overline{\mathbf{Y}}\big)-\mathbf{x}\transpose{\mathbf{x}}\big)^t\right)\right]$, we will need $\eta\leq \frac{1}{s\cdot 2^{6A\tau s^2}}\left(\frac{\epsilon}{6}\right)^{4s^2\tau^2}$.}
\begin{equation}
\label{eq:eq_def_eta}
\eta:=\frac{1}{As\cdot 2^{6A\tau s^2}}\left(\frac{\epsilon}{6}\right)^{4s^2\tau^2}\leq\frac{1}{As}\left(\frac{\epsilon}{6}\right)^{\tau}.
\end{equation}
\end{lemma}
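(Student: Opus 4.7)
The plan is to reduce this to a standard concentration bound on a sum of independent Bernoullis and then check that the specific form of $\Delta$ from \eqref{eq:eq_Delta_form} is large enough to absorb all the factors hidden in $\eta$.

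First, I would rewrite
\[
d_{\mathbf{G}-G(H)}^o(v)=\sum_{u\in [n]\setminus V(H)}\mathbbm{1}_{uv\in\mathbf{G}}.
\]
Conditioned on $\mathbf{x}$, the indicators on the right-hand side are mutually independent (distinct unordered pairs $uv$ correspond to independent edges of the SBM), and each has success probability $(1+\frac{\epsilon}{2}\mathbf{x}_u\mathbf{x}_v)\frac{d}{n}\le \frac{2d}{n}$. Thus, given $\mathbf{x}$, the random variable $d_{\mathbf{G}-G(H)}^o(v)$ is stochastically dominated by a $\mathrm{Binomial}(n-|V(H)|,2d/n)$ variable, whose mean is at most $2d$.

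Next I would apply the standard upper-tail Chernoff bound in the form
\[
\mathbb{P}[X\ge k]\le \left(\frac{e\mu}{k}\right)^{k}\quad\text{for all }k\ge \mu
\]
with $\mu\le 2d$ and $k=\Delta/4$. This is legitimate because $\Delta\ge 40Asd\ge 8d$ implies $\Delta/4\ge \mu$. This gives
\[
\mathbb{P}\!\left[d_{\mathbf{G}-G(H)}^o(v)>\tfrac{\Delta}{4}\,\Big|\,\mathbf{x}\right]\le\left(\frac{8ed}{\Delta}\right)^{\Delta/4}\le\left(\frac{1}{16 e^{3}d^{3}}\right)^{\Delta/4},
\]
where the last step uses $\Delta\ge 128\,e^{4}d^{4}$.

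It then remains to check that $\left(\tfrac{1}{16 e^{3}d^{3}}\right)^{\Delta/4}\le \eta/2$, i.e.\ (taking $\log$)
\[
\tfrac{\Delta}{4}\log(16 e^{3}d^{3})\;\ge\;\log(2As)+6A\tau s^{2}\log 2+4s^{2}\tau^{2}\log(6/\epsilon).
\]
Using $\log(16 e^{3}d^{3})\ge 3$ for $d\ge 1$, it suffices that
\[
\Delta\;\ge\;\tfrac{4}{3}\bigl[\log(2As)+6A\tau s^{2}\log 2+4s^{2}\tau^{2}\log(6/\epsilon)\bigr],
\]
which is in turn implied term-by-term by the three summands in the definition \eqref{eq:eq_Delta_form} of $\Delta$, using $A\ge 1$ and $\log(6/\epsilon)\ge 1$ to bound $4s^{2}\tau^{2}\log(6/\epsilon)\le 4A^{2}\tau^{2}s^{2}(\log(6/\epsilon))^{2}$. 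The only potential subtlety is matching the $(\log(6/\epsilon))^2$ factor in the cubic-in-$\tau$ term of $\Delta$ against the $\log(6/\epsilon)$ factor appearing in $\eta$, but this is exactly the reason for choosing the third term in \eqref{eq:eq_Delta_form} to be quadratic rather than linear in $\log(6/\epsilon)$; no real obstacle arises here, the whole proof is a clean Chernoff plus a bookkeeping check on the definition of $\Delta$.
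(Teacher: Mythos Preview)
Your proof is correct and follows essentially the same approach as the paper. The paper carries out an explicit union bound over subsets of size $\lceil\Delta/4\rceil$ to obtain $\binom{n}{\lceil\Delta/4\rceil}(2d/n)^{\lceil\Delta/4\rceil}$ and then bounds this by $e^{-\Delta/2}$ using $\Delta\ge 128e^4d^4$, whereas you invoke the standard Chernoff tail bound $(e\mu/k)^k$ directly; both are the same computation up to bookkeeping, and both finish by checking that the third summand in \eqref{eq:eq_Delta_form} dominates $\log(2/\eta)$.
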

\begin{proof}
The proof is based on simple calculations based on the probability distribution of the sum of Bernoulli random variables. The detailed proof can be found in \Cref{subsubsec:subsubsec_lem_bound_prob_large_outside_deg}.
\end{proof}

In the following, we will show that $d_{\mathbf{G}-G(H)}^i(v)=0$ with high probability. The following definition will be useful.

\begin{definition}
\label{def:def_crossing}
Let $H$ be a multigraph. We say that a vertex $v\in V(H)$ is \emph{$H$-cross-free in $\mathbf{G}$} if there is no $H$-cross-edge that is present in $\mathbf{G}$, and which is incident to $v$. In other words, $v$ is $H$-cross-free in $\mathbf{G}$ if $d_{\mathbf{G}-G(H)}^i(v)=0$. We say that $v\in V(H)$ is \emph{$H$-crossing in $\mathbf{G}$} if it is not $H$-cross-free in $\mathbf{G}$.

A subset of $V(H)$ is said to be \emph{completely $H$-cross-free in $\mathbf{G}$} if all the vertices in it are $H$-cross-free in $\mathbf{G}$. We say that it is \emph{completely $H$-crossing in $\mathbf{G}$} if all the vertices in it are $H$-crossing in $\mathbf{G}$.

If $\mathbf{G}$ and $H$ are clear from the context, we drop $\mathbf{G}$ and $H$ and simply write cross-free, crossing, completely cross-free, and completely crossing.
\end{definition}

The following lemma shows that the probability that a nonempty subset of $V(H)$ is completely crossing is small. Furthermore, the larger the set, the smaller is the probability. This essentially means that we have $d_{\mathbf{G}-G(H)}^i(v)=0$ with high probability.

\begin{lemma}
\label{lem:lem_bound_prob_completely_crossing}
Let $H$ be a multigraph such that $|V(H)|\leq st$, where $t=K\cdot\log n$. If $n$ is large enough, then for every $S\subseteq V(H)$, the conditional probability given $\mathbf{x}$ that $S$ is completely $H$-crossing in $\mathbf{G}$ can be upper bounded by:
$$\mathbb{P}\left[\left\{S\text{ is completely }H\text{-crossing in }\mathbf{G}\right\}\middle|\mathbf{x}\right]\leq \left(\frac{2d s^2 t^2}{n}\right)^{|S|/2}.$$
\end{lemma}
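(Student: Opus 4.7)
The plan is a straightforward union bound over possible ``witnesses.'' By definition, a vertex $v\in V(H)$ is $H$-crossing in $\mathbf{G}$ precisely when there exists some $u\in V(H)$ such that $uv$ is an $H$-cross-edge lying in $\mathbf{G}$. Hence the event that $S\subseteq V(H)$ is completely $H$-crossing is the union, over all maps $\phi\colon S\to V(H)$, of the event
\[
\mathcal{A}_\phi:=\bigl\{\forall v\in S:\ \phi(v)v\notin E(H)\ \text{and}\ \phi(v)v\in \mathbf{G}\bigr\}.
\]
There are at most $|V(H)|^{|S|}\leq (st)^{|S|}$ such witness maps.

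The key combinatorial observation is that the multiset of ``witness edges'' $E_\phi:=\{\phi(v)v : v\in S\}$ contains at least $\lceil |S|/2\rceil$ \emph{distinct} edges. Indeed, any fixed edge $uv$ has only two endpoints, so it can serve as the witness $\phi(v')v'$ for at most two different vertices $v'\in S$ (namely $v'\in\{u,v\}$). Thus $|E_\phi|\geq |S|/2$ as a set of edges.

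Next I invoke the conditional independence of edges in the stochastic block model. Given $\mathbf{x}$, the edges of $\mathbf{G}$ are mutually independent, and every edge $ij$ is present with probability $(1+\tfrac{\epsilon}{2}\mathbf{x}_i\mathbf{x}_j)\tfrac{d}{n}\leq \tfrac{2d}{n}$. Consequently
\[
\mathbb{P}[\mathcal{A}_\phi\mid\mathbf{x}]\leq \prod_{e\in E_\phi\,(\text{distinct})}\mathbb{P}[e\in\mathbf{G}\mid\mathbf{x}]\leq \Paren{\tfrac{2d}{n}}^{|S|/2}.
\]

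Putting everything together via the union bound,
\[
\mathbb{P}\bigl[S\text{ is completely }H\text{-crossing in }\mathbf{G}\,\big|\,\mathbf{x}\bigr]\leq (st)^{|S|}\cdot \Paren{\tfrac{2d}{n}}^{|S|/2}=\Paren{\tfrac{2ds^2t^2}{n}}^{|S|/2},
\]
which is exactly the claimed inequality. There is no real obstacle: the proof is just a union bound combined with the doubly-counts-at-most-twice observation on $E_\phi$; the hypothesis $|V(H)|\leq st$ is used only to bound the number of witness maps, and ``$n$ large enough'' is needed merely to guarantee $\tfrac{2d}{n}\leq 1$ so the edge probability estimate is meaningful.
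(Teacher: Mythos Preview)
Your proof is correct and follows essentially the same approach as the paper: both hinge on the observation that each cross-edge can witness at most two vertices of $S$, so at least $\lceil |S|/2\rceil$ distinct cross-edges must be present, and then apply a union bound together with the edge probability bound $\mathbb{P}[e\in\mathbf{G}\mid\mathbf{x}]\leq 2d/n$. The only cosmetic difference is that you union over witness maps $\phi\colon S\to V(H)$ (at most $(st)^{|S|}$ of them), whereas the paper unions directly over subsets of $\lceil |S|/2\rceil$ cross-edges (at most $(s^2t^2)^{\lceil |S|/2\rceil}$ of them); both packagings yield the same final bound.
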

\begin{proof}
The proof is based on a simple application of the union bound. The detailed proof can be found in \Cref{subsubsec:subsubsec_lem_bound_prob_large_outside_deg}.
\end{proof}

By combining \Cref{lem:lem_bound_prob_large_outside_deg} and \Cref{lem:lem_bound_prob_completely_crossing} and using the fact that an unsafe vertex $v\in V(H)$ is either crossing or satisfies $d_{\mathbf{G}-G(H)}^o(v)>\frac{\Delta}{2}$, we can show the following upper bound on the probability that a subset of $\mathcal{S}_1(H)\cap \mathcal{S}_{\geq 2}(H)$ is completely unsafe:

\begin{lemma}
\label{lem:lem_bound_prob_completely_unsafe}
Recall the definition of safe vertices in \Cref{def:def_safe_vertices}, and let $H$ be a multigraph such that $|V(H)|\leq st$, where $t=K\cdot\log n$. If $n$ is large enough, then for every $V\subseteq \mathcal{S}_1(H)\cap\mathcal{S}_{\geq 2}(H)$, the conditional probability that $V$ is completely unsafe given $\mathbf{x}$ can be upper bounded by:
$$\mathbb{P}\left[\left\{V\text{ is completely }(\mathbf{G},H)\text{-unsafe}\right\}\middle|\mathbf{x}\right]\leq \eta^{|V|},$$
where $\eta$ is as in \cref{eq:eq_def_eta}.
\end{lemma}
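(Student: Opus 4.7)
The plan is to decompose the event ``$v$ is unsafe'' into two simpler events for each $v \in V$, and then exploit independence to combine the bounds from \Cref{lem:lem_bound_prob_large_outside_deg} and \Cref{lem:lem_bound_prob_completely_crossing}.

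First, I would note that for every $v \in \mathcal{S}_1(H) \cap \mathcal{S}_{\geq 2}(H)$, we have
$d_{G(H)}(v) = d_1^H(v) + d_{\geq 2}^H(v) \le \tau + \tfrac{\Delta}{4} \le \tfrac{\Delta}{2}$,
using the choice of $\Delta$ in \cref{eq:eq_Delta_form} which ensures $\tau \le \Delta/4$. Consequently, if $v$ is unsafe, then $d_{\mathbf{G}-G(H)}(v) > \Delta - d_{G(H)}(v) \ge \Delta/2$. Splitting this into the inside and outside contributions $d_{\mathbf{G}-G(H)}(v) = d_{\mathbf{G}-G(H)}^i(v) + d_{\mathbf{G}-G(H)}^o(v)$, at least one of these exceeds $\Delta/4$. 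In particular, $v$ unsafe implies
\[
\{v \text{ is crossing}\} \; \cup \; \{d_{\mathbf{G}-G(H)}^o(v) > \tfrac{\Delta}{4}\}\,.
\]

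Next I would enumerate over which of the two alternatives applies to each $v$. For any subset $V_o \subseteq V$, let $\mathcal{E}_{V_o}$ denote the event that every $v \in V_o$ satisfies $d_{\mathbf{G}-G(H)}^o(v) > \Delta/4$ and every $v \in V \setminus V_o$ is crossing. The key observation is that these two sub-events depend on disjoint sets of edges of $\mathbf{G}$: the outside-degree events for $V_o$ depend only on edges between $V_o$ and $[n]\setminus V(H)$, while the crossing events for $V \setminus V_o$ depend only on $H$-cross-edges, which lie inside $V(H) \times V(H)$. Since edges of $\mathbf{G}$ are conditionally independent given $\mathbf{x}$, the two events are conditionally independent, and moreover the outside-degree events for distinct vertices are themselves conditionally mutually independent. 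Thus
\[
\mathbb{P}[\mathcal{E}_{V_o}|\mathbf{x}] \le \left(\tfrac{\eta}{2}\right)^{|V_o|} \cdot \left(\tfrac{2d s^2 t^2}{n}\right)^{(|V|-|V_o|)/2},
\]
by \Cref{lem:lem_bound_prob_large_outside_deg} applied to each $v \in V_o$ and \Cref{lem:lem_bound_prob_completely_crossing} applied to $V \setminus V_o$.

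Finally, I would take a union bound over all $V_o \subseteq V$ and apply the binomial theorem:
\[
\mathbb{P}[V \text{ completely unsafe}|\mathbf{x}] \le \sum_{V_o \subseteq V} \mathbb{P}[\mathcal{E}_{V_o}|\mathbf{x}] \le \left( \tfrac{\eta}{2} + \sqrt{\tfrac{2 d s^2 t^2}{n}} \right)^{|V|}.
\]
Since $\eta$ is a constant depending only on $A,s,\epsilon$ (independent of $n$) while $\sqrt{2ds^2t^2/n} = O(\log n / \sqrt{n}) \to 0$, for $n$ large enough we have $\tfrac{\eta}{2} + \sqrt{2ds^2t^2/n} \le \eta$, which yields the claimed bound $\eta^{|V|}$. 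No step appears to present a serious obstacle — the argument is a direct union bound once the correct decomposition and independence structure are identified; the only mildly delicate point is verifying that the inside/outside edge sets underlying the two event families are genuinely disjoint.
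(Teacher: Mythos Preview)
Your proposal is correct and follows essentially the same approach as the paper: both reduce ``$v$ unsafe'' to the disjunction ``$v$ is crossing or $d^o_{\mathbf{G}-G(H)}(v)$ is large,'' enumerate over which alternative holds for each $v\in V$, exploit the conditional independence (given $\mathbf{x}$) between cross-edge events and outside-degree events, apply \Cref{lem:lem_bound_prob_large_outside_deg} and \Cref{lem:lem_bound_prob_completely_crossing}, and collapse the sum via the binomial identity to $(\eta/2+st\sqrt{2d/n})^{|V|}\le\eta^{|V|}$. The only cosmetic difference is that the paper partitions $V$ according to the actual cross-free/crossing status and then observes that cross-free unsafe vertices must have $d^o>\Delta/2$, whereas you union-bound directly over which branch of the disjunction fires; the resulting sums are identical.
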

\begin{proof}
The detailed proof can be found in \Cref{subsubsec:subsubsec_lem_bound_prob_large_outside_deg}.
\end{proof}

Since $\eta$ is very small for large $A$, \Cref{lem:lem_bound_prob_completely_unsafe} implies that it is unlikely for unsafe edges to occur. On the other hand, the following lemma implies that safe edges behave similarly to the truncated case. These two observations are what ultimately makes it possible to derive a good upper bound on $\big|\mathbb{E}\big[\overline{\mathbf{Y}}_H\big|\mathbf{x}\big]\big|$.

\begin{lemma}
\label{lem:lem_expectation_safe_edge}
Let $H$ be a multigraph and let $uv\in E(H)$. Let $\mathcal{E}$ be an event satisfying:
\begin{itemize}
\item $\mathcal{E}$ implies that $uv$ is $(\mathbf{G},H)$-safe, i.e., $\forall (G,x)\in\mathcal{E}$, $u$ and $v$ are $(G,H)$-safe.
\item The event $\mathcal{E}$ depends only on $\mathbf{x}$ and $\mathbf{G}-uv$, i.e., $\mathcal{E}$ is $\sigma(\mathbf{x},\mathbf{G}-uv)$-measurable. In other words, if we condition on $\mathbf{x}$, then $\mathcal{E}$ depends only $(\mathbbm{1}_{u'v'\in\mathbf{G}})_{u',v'\in [n]:\; u'v'\neq uv}$. This implies that given $\mathbf{x}$, the event $\mathcal{E}$ is conditionally independent from $\mathbbm{1}_{\{uv\in\mathbf{G}\}}$.
\end{itemize}
Then,
\begin{equation}
\label{eq:eq_lem_expectation_safe_edge_1}
\forall (G,x)\in\mathcal{E},\;\overline{\mathbf{Y}}_{uv}(G)=\mathbf{Y}_{uv}(G),
\end{equation}
and
 \begin{equation}
\label{eq:eq_lem_expectation_safe_edge_2}
\mathbb{E}\big[\overline{\mathbf{Y}}_{uv}^{m_H(uv)}\big|\mathbf{x},\mathcal{E}\big]=\mathbb{E}\big[\mathbf{Y}_{uv}^{m_H(uv)}\big|\mathbf{x}\big].
\end{equation}
Furthermore, if $H'$ is a submultigraph of $H$ containing $uv$ with the same multiplicity as in $H$, then we have
 \begin{equation}
\label{eq:eq_lem_expectation_safe_edge_3}
\mathbb{E}\big[\overline{\mathbf{Y}}_{H'}\big|\mathbf{x},\mathcal{E}\big]=\mathbb{E}\big[\mathbf{Y}_{uv}^{m_H(uv)}\big|\mathbf{x}\big]\cdot \mathbb{E}\big[\overline{\mathbf{Y}}_{H'-uv}\big|\mathbf{x},\mathcal{E}\big].
\end{equation}
\end{lemma}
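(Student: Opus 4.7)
The plan is to prove the three claims in order, with the first essentially being the definitional reason safe edges are nicely behaved, and the other two following by conditional independence arguments once the measurability issues are handled correctly.

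First, I would establish \cref{eq:eq_lem_expectation_safe_edge_1}. Fix $(G,x)\in\mathcal{E}$. By hypothesis $u$ and $v$ are $(G,H)$-safe, which by \Cref{def:def_safe_vertices} means $d_{G-G(H)}(w)+d_{G(H)}(w)\leq\Delta$ for $w\in\{u,v\}$. Since $d_{G}(w)=d_{G-G(H)}(w)+d_{G\cap G(H)}(w)\leq d_{G-G(H)}(w)+d_{G(H)}(w)\leq\Delta$, neither $u$ nor $v$ is truncated in $G$. Recalling from \Cref{subsec:subsec_notation_truncation} that $\overline{\mathbf{Y}}_{uv}(G)=\mathbf{Y}_{uv}(G)\cdot\mathbbm{1}_{\mathcal{E}_u^c\cap\mathcal{E}_v^c}$, we obtain $\overline{\mathbf{Y}}_{uv}(G)=\mathbf{Y}_{uv}(G)$.

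Next, for \cref{eq:eq_lem_expectation_safe_edge_2}, I apply \cref{eq:eq_lem_expectation_safe_edge_1} pointwise on $\mathcal{E}$ to write
\[
\mathbb{E}\bigl[\overline{\mathbf{Y}}_{uv}^{m_H(uv)}\bigm|\mathbf{x},\mathcal{E}\bigr]=\mathbb{E}\bigl[\mathbf{Y}_{uv}^{m_H(uv)}\bigm|\mathbf{x},\mathcal{E}\bigr].
\]
The SBM edge variables $\bigl(\mathbbm{1}_{u'v'\in\mathbf{G}}\bigr)_{u'v'}$ are conditionally mutually independent given $\mathbf{x}$. Since $\mathbf{Y}_{uv}^{m_H(uv)}$ is a deterministic function of $\mathbf{x}$ and $\mathbbm{1}_{uv\in\mathbf{G}}$, while $\mathcal{E}$ is $\sigma(\mathbf{x},\mathbf{G}-uv)$-measurable by hypothesis, conditional independence given $\mathbf{x}$ yields $\mathbb{E}\bigl[\mathbf{Y}_{uv}^{m_H(uv)}\bigm|\mathbf{x},\mathcal{E}\bigr]=\mathbb{E}\bigl[\mathbf{Y}_{uv}^{m_H(uv)}\bigm|\mathbf{x}\bigr]$, as required.

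The main subtlety lies in \cref{eq:eq_lem_expectation_safe_edge_3}: although $\overline{\mathbf{Y}}_{H'-uv}$ is generally \emph{not} $\sigma(\mathbf{x},\mathbf{G}-uv)$-measurable (truncation indicators at $u$ and $v$ involve $d_{\mathbf{G}}(u)$ and $d_{\mathbf{G}}(v)$, which depend on $\mathbbm{1}_{uv\in\mathbf{G}}$), I claim that $\overline{\mathbf{Y}}_{H'-uv}\cdot\mathbbm{1}_{\mathcal{E}}$ \emph{is}. Indeed, on $\mathcal{E}$ we have $u,v\notin\mathcal{E}_u\cup\mathcal{E}_v$ regardless of whether $uv\in\mathbf{G}$, so the truncation indicators $\mathbbm{1}_{\mathcal{E}_u^c},\mathbbm{1}_{\mathcal{E}_v^c}$ equal $1$ on $\mathcal{E}$, and all other truncation indicators $\mathbbm{1}_{\mathcal{E}_w^c}$ for $w\neq u,v$ depend only on $\mathbf{G}-uv$. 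Combined with the fact that the remaining factors $\mathbf{Y}_{u'v'}^{m_H(u'v')}$ for $u'v'\in E(H'-uv)$ do not involve $\mathbbm{1}_{uv\in\mathbf{G}}$, we conclude that $\overline{\mathbf{Y}}_{H'-uv}\cdot\mathbbm{1}_{\mathcal{E}}$ is a function of $\mathbf{x}$ and $\mathbf{G}-uv$ alone.

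Given this, I would finish by writing
\[
\mathbb{E}\bigl[\overline{\mathbf{Y}}_{H'}\bigm|\mathbf{x},\mathcal{E}\bigr]\;=\;\mathbb{E}\bigl[\mathbf{Y}_{uv}^{m_H(uv)}\cdot\overline{\mathbf{Y}}_{H'-uv}\bigm|\mathbf{x},\mathcal{E}\bigr],
\]
using \cref{eq:eq_lem_expectation_safe_edge_1}. Inside the conditional expectation (on $\mathcal{E}$), the first factor is $\sigma(\mathbf{x},\mathbbm{1}_{uv\in\mathbf{G}})$-measurable while the second, by the claim above, is $\sigma(\mathbf{x},\mathbf{G}-uv,\mathcal{E})$-measurable; moreover $\mathbbm{1}_{uv\in\mathbf{G}}$ is conditionally independent of $(\mathbf{G}-uv,\mathcal{E})$ given $\mathbf{x}$. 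Factoring the conditional expectation and then using \cref{eq:eq_lem_expectation_safe_edge_2} on the first factor produces $\mathbb{E}\bigl[\mathbf{Y}_{uv}^{m_H(uv)}\bigm|\mathbf{x}\bigr]\cdot\mathbb{E}\bigl[\overline{\mathbf{Y}}_{H'-uv}\bigm|\mathbf{x},\mathcal{E}\bigr]$, completing the proof. The main obstacle is precisely the measurability check above: it is the only nontrivial step and the reason the hypothesis that $\mathcal{E}$ \emph{implies} safety of $uv$ (not merely that $uv$ be safe in expectation) is essential.
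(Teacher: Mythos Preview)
Your proof is correct and follows the same approach as the paper: replace $\overline{\mathbf{Y}}_{uv}$ by $\mathbf{Y}_{uv}$ on $\mathcal{E}$, factor the conditional expectation using conditional independence, then drop the conditioning on $\mathcal{E}$ for the $\mathbf{Y}_{uv}$ factor. Your explicit verification that $\overline{\mathbf{Y}}_{H'-uv}\cdot\mathbbm{1}_{\mathcal{E}}$ is $\sigma(\mathbf{x},\mathbf{G}-uv)$-measurable is in fact more careful than the paper's justification, which asserts the conditional independence at step~$(\ast)$ without spelling out why the truncation indicators at $u$ and $v$ do not spoil it.
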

\begin{proof}
Equations \cref{eq:eq_lem_expectation_safe_edge_1} and \cref{eq:eq_lem_expectation_safe_edge_2} are trivial. In order to get Equation \cref{eq:eq_lem_expectation_safe_edge_3}, observe that we have
\begin{align*}
\mathbb{E}\big[\overline{\mathbf{Y}}_{H'}\big|\mathbf{x},\mathcal{E}\big]&=\mathbb{E}\big[\overline{\mathbf{Y}}_{uv}^{m_H(uv)}\cdot\overline{\mathbf{Y}}_{H'-e}\big|\mathbf{x},\mathcal{E}\big]\\
&=\mathbb{E}\big[\mathbf{Y}_{uv}^{m_H(uv)}\cdot \overline{\mathbf{Y}}_{H'-e}\big|\mathbf{x},\mathcal{E}\big]\\
&\stackrel{(\ast)}{=}\mathbb{E}\big[\mathbf{Y}_{uv}^{m_H(uv)}\big|\mathbf{x},\mathcal{E}\big]\cdot \mathbb{E}\big[\overline{\mathbf{Y}}_{H'-{uv}}\big|\mathbf{x},\mathcal{E}\big]\\
&\stackrel{(\dagger)}{=}\mathbb{E}\big[\mathbf{Y}_{uv}^{m_H(uv)}\big|\mathbf{x}\big]\cdot \mathbb{E}\big[\overline{\mathbf{Y}}_{H'-uv}\big|\mathbf{x},\mathcal{E}\big],
 \end{align*}
 where $(\ast)$ follows from the fact that $\mathcal{E}$ is $\sigma(\mathbf{x},\mathbf{G}-uv)$-measurable, which implies that given $\mathbf{x}$ and $\mathcal{E}$, we have that $\mathbf{Y}_{uv}=\mathbbm{1}_{\{uv\in\mathbf{G}\}}-\frac{d}{n}$ is conditionally independent from $\left(\overline{\mathbf{Y}}_{u'v'}\right)_{u'v'\in E(H'-uv)}$, and $(\dagger)$ follows from the fact that given $\mathbf{x}$, the event $\mathcal{E}$ is conditionally independent of $\mathbbm{1}_{\{uv\in\mathbf{G}\}}$.
\end{proof}

We will only use \Cref{lem:lem_expectation_safe_edge} to upper bound the contribution of multiplicity-1 edges. For edges of multiplicity $\geq\hspace*{-1.2mm}2$, we do not need the safeness mechanism. In fact, for edges of multiplicity $\geq\hspace*{-1.2mm}2$, we will ignore the edges in $\mathbf{G}-G(H)$ and the edges in $\mathbf{G}\cap E_1(H)$, and focus on the truncation events that are caused by having too many edges in $\mathbf{G}\cap E_{\geq 2}(H)$. This motivates the following definition:

\begin{definition}
\label{def:def_tilde_X_E_2}
For every multigraph $H$, define
$$\tilde{\mathbf{Y}}_{\geq 2}^H=\prod_{uv\in E_{\geq 2}(H)}\tilde{\mathbf{Y}}_{uv,E_{\geq 2}(H)}^{m_H(uv)},$$
where
$$\tilde{\mathbf{Y}}_{uv,E_{\geq 2}(H)}:=\mathbf{Y}_{uv}\cdot\mathbbm{1}_{\{d_{\mathbf{G}\cap E_{\geq 2}(H)}(u)\leq \Delta\}}\cdot \mathbbm{1}_{\{d_{\mathbf{G}\cap E_{\geq 2}(H)}(v)\leq \Delta\}},$$
and
$$d_{\mathbf{G}\cap E_{\geq 2}(H)}(v):=\big|\big\{w\in V(H):\; vw\in \mathbf{G}\cap E_{\geq 2}(H)\big\}\big|.$$
\end{definition}

\begin{lemma}
\label{lem:lem_UH_deg1}
For every multigraph $H$ with at most $st=sK\log n$ vertices and at most $st$ multi-edges, we have
$$\big|\mathbb{E}\big[\overline{\mathbf{Y}}_H\big|\mathbf{x}\big]\big|\leq n^{\frac{2K}{A}}\left(\frac{6}{\epsilon}\right)^{|E_1^a(H)|}\left(\frac{\epsilon d}{2n}\right)^{|E_1(H)|} \cdot\mathbb{E}\big[|\tilde{\mathbf{Y}}_{\geq 2}^H|\big|\mathbf{x}\big],$$
where $E_1^a(H)$ is as in \Cref{def:def_E1_E2_annoying}.
\end{lemma}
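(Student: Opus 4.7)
}
The plan is to factor $\overline{\mathbf{Y}}_H = \mathbf{Y}_H \cdot \prod_{v\in V(H)} \mathbbm{1}_{\{v \text{ not truncated}\}}$ and then peel off the multiplicity-1 edges one at a time using the safeness mechanism of \cref{lem:lem_expectation_safe_edge}, leaving behind only the contribution of $E_{\geq 2}(H)$. Concretely, set $V_u := \mathcal{S}_1(H)\cap\mathcal{S}_{\geq 2}(H)$ (the ``well-behaved'' vertices) and partition the sample space according to the set $S\subseteq V_u$ of unsafe vertices in $V_u$, writing
\[
\mathbb{E}[\overline{\mathbf{Y}}_H\mid \mathbf{x}] \;=\; \sum_{S\subseteq V_u}\mathbb{E}\big[\overline{\mathbf{Y}}_H\cdot \mathbbm{1}_{\mathcal{A}_S}\,\big|\,\mathbf{x}\big],
\]
where $\mathcal{A}_S$ is the event that the unsafe vertices of $V_u$ are exactly $S$. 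The event $\mathcal{A}_S$ is $\sigma(\mathbf{x},\mathbf{G}-E_1(H))$-measurable since safeness depends only on $\mathbf{x}$ and on $\mathbf{G}-G(H)$ together with $G(H)$, so conditioning on it preserves the conditional independence of $\mathbf{Y}_{uv}$ (for $uv\in E_1(H)$) from the rest.

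On each $\mathcal{A}_S$, every $uv\in E_1(H)$ with both endpoints in $V_u\setminus S$ is safe; by a repeated application of \cref{lem:lem_expectation_safe_edge} (equation \cref{eq:eq_lem_expectation_safe_edge_3}) one can factor out each such edge, contributing $|\mathbb{E}[\mathbf{Y}_{uv}\mid\mathbf{x}]|=\frac{\epsilon d}{2n}$ in absolute value. For the remaining multiplicity-1 edges --- namely the annoying ones in $E_1^a(H)$ and those incident to $S$ or to a vertex outside $V_u$ --- I would apply the deterministic bound $|\overline{\mathbf{Y}}_{uv}|\le 1$ and then absorb the loss into the factor $\frac{\epsilon d}{2n}$ by paying a multiplicative $\frac{6}{\epsilon}$ per such edge (this is the source of the $(6/\epsilon)^{|E_1^a(H)|}$ factor). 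Finally, the remaining multiplicity-$\geq 2$ part is bounded by $\mathbb{E}\big[|\tilde{\mathbf{Y}}_{\geq 2}^H|\,\big|\,\mathbf{x}\big]$ by \emph{dropping} truncation indicators coming from $E_1(H)$ and $\mathbf{G}-G(H)$ (which can only increase the absolute value), keeping only the local indicators $\mathbbm{1}_{\{d_{\mathbf{G}\cap E_{\geq 2}(H)}(\cdot)\le \Delta\}}$ that define $\tilde{\mathbf{Y}}_{\geq 2}^H$.

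It then remains to sum over $S\subseteq V_u$. Using \cref{lem:lem_bound_prob_completely_unsafe} we have $\mathbb{P}[\mathcal{A}_S\mid \mathbf{x}]\le \mathbb{P}[S\text{ completely unsafe}\mid \mathbf{x}]\le \eta^{|S|}$, and the loose bound used for edges incident to $S$ only costs a bounded constant per vertex of $S$ (which can be absorbed into $\eta$ by our choice of $\Delta$). Since $|V_u|\le |V(H)|\le st = sK\log n$, the geometric sum yields
\[
\sum_{S\subseteq V_u}(C\eta)^{|S|} \;\le\; (1+C\eta)^{sK\log n} \;\le\; n^{sKC\eta/\log e}\;\le\; n^{2K/A},
\]
where the last step uses that by \cref{eq:eq_def_eta} and the definition of $\tau$ in \cref{eq:eq_deg1_Threshold}, $\eta$ is much smaller than $\frac{1}{sA}$ (this is precisely why the truncation threshold $\Delta$ in \cref{eq:eq_Delta_form} is taken polynomial in $s$, $d$, $\log(1/\epsilon)$).

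The main obstacle is the bookkeeping in the peeling step: when factoring out a safe multiplicity-1 edge $uv$, the event $\mathcal{A}_S$ on which we condition is itself $\sigma(\mathbf{x},\mathbf{G}-uv)$-measurable only because safeness depends on $\mathbf{G}-G(H)$ rather than on individual edges of $E(H)$. One must therefore iterate \cref{lem:lem_expectation_safe_edge} carefully, ensuring at each step that the conditioning event remains independent of the edge being factored out; this is where the distinction between $V_u$ (for which the safeness mechanism is effective) and its complement (which forces the $6/\epsilon$ loss on annoying edges) becomes essential.
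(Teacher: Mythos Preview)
Your overall strategy---partition by the set $S$ of unsafe vertices in $V_u=\mathcal{S}_1(H)\cap\mathcal{S}_{\geq 2}(H)$, peel off safe multiplicity-1 edges via \cref{lem:lem_expectation_safe_edge}, and sum over $S$ weighted by $\eta^{|S|}$---matches the paper's approach (see \cref{lem:lem_UH_deg1_common}). But two steps are wrong or missing.

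First, the bound $|\overline{\mathbf{Y}}_{uv}|\le 1$ for the non-safe multiplicity-1 edges is far too loose: relative to the target $\frac{\epsilon d}{2n}$, it costs $\frac{2n}{\epsilon d}$ per edge, not $\frac{6}{\epsilon}$. The correct move (and the one the paper uses) is $|\overline{\mathbf{Y}}_{uv}|\le|\mathbf{Y}_{uv}|$ together with the conditional independence of $\mathbf{Y}_{uv}$ from $(\tilde{\mathbf{Y}}_{\geq 2}^H,\mathcal{A}_S)$ given $\mathbf{x}$, so that the contribution of each such edge is $\mathbb{E}\big[|\mathbf{Y}_{uv}|\,\big|\,\mathbf{x}\big]\le\big(2+\tfrac{\epsilon}{2}\big)\tfrac{d}{n}\le\tfrac{3d}{n}=\tfrac{6}{\epsilon}\cdot\tfrac{\epsilon d}{2n}$. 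This is precisely where the $6/\epsilon$ factor comes from.

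Second, you only budget $(6/\epsilon)^{|E_1^a(H)|}$ for edges with an endpoint outside $V_u$, but $V_u^c=\mathcal{L}_1(H)\cup\big(\mathcal{I}_{\geq 2}(H)\cup\mathcal{L}_{\geq 2}(H)\big)$, so there is a second class of problematic edges, $E_1^d(H)=\{uv\in E_1(H)\setminus E_1^a(H): u\text{ or }v\notin\mathcal{S}_{\geq 2}(H)\}$. These are \emph{not} annoying and contribute an extra $(6/\epsilon)^{|E_1^d(H)|}$ that your argument does not account for. The paper handles this separately (\cref{lem:lem_UH_deg1_common_E_1d}): since each vertex in $\mathcal{I}_{\geq 2}(H)\cup\mathcal{L}_{\geq 2}(H)$ has $(\geq\!2)$-degree $>\Delta/4$, there are at most $8st/\Delta$ such vertices, and each (being $1$-small) is incident to at most $\tau$ multiplicity-1 edges, so $|E_1^d(H)|\le t/(A\log(6/\epsilon))$ and $(6/\epsilon)^{|E_1^d(H)|}\le n^{K/A}$. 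This is the source of the \emph{second} factor of $n^{K/A}$ in the final bound $n^{2K/A}$; your geometric sum over $S$ only produces one of them.
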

\begin{proof}
We only provide a proof sketch here. The detailed proof can be found in \Cref{subsubsec:subsubsec_edges_mult_1_upper}.

The main idea of the proof is based on partitioning $E_1(H)$ into three sets:
$$E_1^a(H)=\Big\{uv\in E_1(H):\; u\in \mathcal{L}_1(H)\text{ or }v\in \mathcal{L}_{1}(H)\Big\},$$
\begin{equation}
\label{eq:eq_E_1b_H}
E_1^b(H)=\Big\{uv\in E_1(H)\setminus E_1^a(H):\; u\in \mathcal{S}_{\geq 2}(H)\text{ and }v\in \mathcal{S}_{\geq 2}(H)\Big\},
\end{equation}
and
\begin{equation}
\label{eq:eq_E_1c_H}
E_1^d(H)=\Big\{uv\in E_1(H)\setminus E_1^a(H):\; u\notin \mathcal{S}_{\geq 2}(H)\text{ or }v\notin\mathcal{S}_{\geq 2}(H)\Big\}.
\end{equation}

The end-vertices of edges in $E_1^b(H)$ belong to $\mathcal{S}_1(H)\cap\mathcal{S}_{\geq 2}(H)$. \Cref{lem:lem_bound_prob_completely_unsafe} implies that it is unlikely for edges in $E_1^b(H)$ to be unsafe, and so \Cref{lem:lem_expectation_safe_edge} implies that they will likely behave similarly to the truncated case. As we will see in the detailed proof in \Cref{subsubsec:subsubsec_edges_mult_1_upper}, the total contribution of edges in $E_1^b(H)$ can be upper bounded by $\displaystyle n^{\frac{K}{A}}\left(\frac{\epsilon d}{2n}\right)^{|E_1^b(H)|}$. Note that the term $\displaystyle\left(\frac{\epsilon d}{2n}\right)^{|E_1^b(H)|}$ is the contribution of edges in $E_1^b(H)$ in the non-truncated case. The factor $n^{\frac{K}{A}}$ comes from the fact that the edges in $E_1^b(H)$ are not always safe: They are only likely to be so, and the small probability for the edges in $E_1^b(H)$ to be unsafe will ultimately cause a multiplication by a factor that can be upper bounded by $n^{\frac{K}{A}}$.

For the edges in $E_1^a(H)\cup E_1^d(H)$, we did not find an easy way to get a good upper bound on their contribution, so we used a potentially very loose upper bound. Roughly speaking, we used the fact that $\big|\overline{\mathbf{Y}}_{uv}\big|\leq |\mathbf{Y}_{uv}|$ for every $uv\in G(H)$ in order to upper bound the contribution of an edge $uv\in E_1^a(H)\cup E_1^d(H)$ by $\mathbb{E}\big[|\mathbf{Y}_{uv}|\big|\mathbf{x}\big]\leq \frac{3d}{n}$. Therefore, the total contribution of edges in $E_1^a(H)\cup E_1^d(H)$ can be upper bounded by
$$\left(\frac{3d}{n}\right)^{|E_1^a(H)|+| E_1^d(H)|}=\left(\frac{6}{\epsilon}\right)^{|E_1^a(H)|}\left(\frac{6}{\epsilon}\right)^{|E_1^d(H)|}\left(\frac{\epsilon d}{2n}\right)^{|E_1^a(H)|+| E_1^d(H)|}.$$
Now since the vertices in $V(H)\setminus\mathcal{S}_{\geq 2}(H)$ have degrees of at least $\frac{\Delta}{4}$, we cannot have too many vertices in $V(H)\setminus\mathcal{S}_{\geq 2}(H)$. Now since every edge in $E_1^d(H)$ must be incident to a vertex in $\mathcal{S}_1(H)\cap \big(V(H)\setminus\mathcal{S}_{\geq 2}(H)\big)$ and since every vertex in $\mathcal{S}_1(H)$ is incident to at most $\tau$ vertices, we can deduce that we cannot have too many edges in $E_1^d(H)$. This observation be used to show that $\displaystyle\left(\frac{6}{\epsilon}\right)^{|E_1^d(H)|}\leq n^{\frac{K}{A}}$.

For edges of multiplicity $\geq\hspace*{-1.2mm}2$, it is easy to see that $$\prod_{uv\in E_{\geq 2}(H)}\big|\overline{\mathbf{Y}}_{uv}^{m_H(uv)}\big|\leq \big|\tilde{\mathbf{Y}}_{\geq 2}^H\big|.$$ By combining all these observations together, we get
$$\big|\mathbb{E}\big[\overline{\mathbf{Y}}_H\big|\mathbf{x}\big]\big|\leq n^{\frac{2K}{A}}\left(\frac{6}{\epsilon}\right)^{|E_1^a(H)|}\left(\frac{\epsilon d}{2n}\right)^{|E_1(H)|} \cdot\mathbb{E}\big[|\tilde{\mathbf{Y}}_{\geq 2}^H|\big|\mathbf{x}\big].$$
\end{proof}

\subsubsection{Analyzing edges of multiplicity at least 2}

\begin{lemma}
\label{lem:lem_UH_deg2}
For every multigraph $H$ with at most $st=sK\log n$ vertices and at most $st$ multi-edges, and for $n$ large enough, we have
$$\mathbb{E}\big[|\tilde{\mathbf{Y}}_{\geq 2}^H|\big|\mathbf{x}\big]\leq \frac{1}{\resizebox{0.055\textwidth}{!}{$\displaystyle\prod_{v\in\mathcal{L}_{\geq2}(H)}$} n^{\frac{1}{4}\left(d^H_{\geq 2}(v)-\Delta\right)}}\left(\frac{d}{n}\right)^{|\E_{\geq2}(H)|}\prod_{uv\in E_{\geq2}^a(H)}\left[1+\frac{\epsilon \mathbf{x}_u\mathbf{x}_v}{2}+\frac{d}{n}\right],$$
where $\tilde{\mathbf{Y}}_{\geq 2}^H$ is as in \Cref{lem:lem_UH_deg1}, and $E_{\geq2}^a(H)$ and $E_{\geq2}^b(H)$ are as in \Cref{def:def_E1_E2_annoying}.
\end{lemma}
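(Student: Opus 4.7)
The plan is to expand the conditional expectation as a sum over realizations of $F = \mathbf{G} \cap E_{\geq 2}(H)$, factor it using conditional independence of edges given $\mathbf{x}$, and exploit the fact that the truncation indicator $\mathbbm{1}_{d_F(v)\leq\Delta}$ imposes nontrivial constraints \emph{only} at vertices $v \in \mathcal{L}_{\geq 2}(H)$. Writing
\begin{align*}
\mathbb{E}\big[|\tilde{\mathbf{Y}}_{\geq 2}^H|\,\big|\,\mathbf{x}\big]
=\sum_{F\subseteq E_{\geq 2}(H)}\mathbbm{1}_{\forall v:\,d_F(v)\leq\Delta}\prod_{e\in F}(1-d/n)^{m_H(e)}p_e\prod_{e\notin F}(d/n)^{m_H(e)}(1-p_e),
\end{align*}
where $p_e=(1+\tfrac{\epsilon\mathbf{x}_u\mathbf{x}_v}{2})\tfrac{d}{n}$ for $e=uv$, the constraint is vacuous for $v\notin\mathcal{L}_{\geq 2}(H)$ since $d_{\geq 2}^H(v)\leq\Delta$ for such $v$. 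Hence the truncation only couples the edges of $E_{\geq 2}^b(H)$, while edges of $E_{\geq 2}^a(H)$ can be summed independently.

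For the $E_{\geq 2}^a(H)$ part, each edge contributes at most
$(1-d/n)^{m_H(e)}p_e+(d/n)^{m_H(e)}(1-p_e)\leq p_e+(d/n)^{m_H(e)}\leq \tfrac{d}{n}\bigl[1+\tfrac{\epsilon\mathbf{x}_u\mathbf{x}_v}{2}+\tfrac{d}{n}\bigr]$,
using $m_H(e)\geq 2$. The product of these per-edge bounds yields the factor $\prod_{uv\in E_{\geq 2}^a(H)}[\,\cdot\,]$ together with $(d/n)^{|E_{\geq 2}^a(H)|}$ in the target bound.

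It then remains to bound the restricted sum over $F'\subseteq E_{\geq 2}^b(H)$ subject to $d_{F'}(v)\leq\Delta$ for each $v\in\mathcal{L}_{\geq 2}(H)$ by $\bigl(\tfrac{d}{n}\bigr)^{|E_{\geq 2}^b(H)|}\cdot\prod_{v\in\mathcal{L}_{\geq 2}(H)}n^{-\tfrac{1}{4}(d_{\geq 2}^H(v)-\Delta)}$. The key observation is that for each $v\in\mathcal{L}_{\geq 2}(H)$, at least $d_{\geq 2}^H(v)-\Delta$ edges incident to $v$ must be absent, each contributing a factor $(d/n)^{m_H(e)}(1-p_e)\leq (d/n)^2$ instead of the "present" factor $\leq 2d/n$, hence gaining a factor of order $d/n$ per forced absence. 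I will make this rigorous via a union-bound style argument: for each $v\in\mathcal{L}_{\geq 2}(H)$ I choose a set $S_v$ of $d_{\geq 2}^H(v)-\Delta$ incident edges forced to lie in $F'^c$, upper bound the indicator by the sum over such choices, and then drop the remaining constraints so that the sum over $F'$ factorizes across edges. Each unconstrained edge then contributes at most $2d/n$ (absorbable into the per-edge $d/n$ with a constant to spare), each forced-absent edge contributes $(d/n)^2$, and the combinatorial factor $\binom{d_{\geq 2}^H(v)}{\Delta}\leq (st)^{\Delta}$ is polynomial in $d,s$ and easily absorbed using that $\Delta$ is constant.

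The main obstacle will be the double-counting: an edge of $E_{\geq 2}^b(H)$ with both endpoints in $\mathcal{L}_{\geq 2}(H)$ can be selected in $S_u$ and $S_v$ simultaneously, so a naive union bound overcounts. The cushion in the exponent $\tfrac{1}{4}(d_{\geq 2}^H(v)-\Delta)$, which is much weaker than the $(d_{\geq 2}^H(v)-\Delta)$ gain suggested by the per-absence analysis, is precisely what absorbs this overcount (at most a factor of $2$ per edge) together with the combinatorial factors above. Given the choice of $\Delta$ in \cref{eq:eq_Delta_form} (in particular $\Delta\gtrsim d^4$), for $n$ large the per-absence gain $d/n$ dominates the accumulated combinatorial and doubling losses by a wide margin, so the crude $n^{-1/4}$ gain per excess edge is easily met.
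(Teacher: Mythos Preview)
Your proposal is correct and follows essentially the same approach as the paper: split $E_{\geq 2}(H)$ into $E_{\geq 2}^a(H)$ and $E_{\geq 2}^b(H)$, factor the $a$-part edge by edge, and for the $b$-part exploit that each $v\in\mathcal{L}_{\geq 2}(H)$ forces at least $d_{\geq 2}^H(v)-\Delta$ incident edges to be absent, each costing an extra $\Theta(d/n)$. The only execution difference is in handling the double-counting: you use a union bound over tuples $(S_v)_v$ of forced-absent edges together with $|\bigcup_v S_v|\ge \tfrac12\sum_v|S_v|$, whereas the paper rewrites $(2d/n)^{|E_{\geq 2}^b(H)|-|S|}=(\sqrt{2d/n})^{\sum_v(|E_{\geq 2}^b(v,H)|-|S_v|)}$ to distribute the penalty half-and-half to the two endpoints of each internal edge and then factor over vertices. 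Both routes land on the same per-vertex inequality $2^{\Delta}\,\mathrm{poly}(st)^{\Delta}\,(C\sqrt{d/n})^{d_{\geq 2}^H(v)-\Delta}\le n^{-\frac14(d_{\geq 2}^H(v)-\Delta)}$, which holds for $n$ large since $\Delta$ and $d$ are constants. One small imprecision: your combinatorial factor $\binom{d_{\geq 2}^H(v)}{\Delta}\le (st)^{\Delta}$ is not ``polynomial in $d,s$'' but rather $(sK\log n)^{\Delta}$, i.e., polylogarithmic in $n$; this is still negligible against $n^{\frac14(d_{\geq 2}^H(v)-\Delta)}$ so the argument goes through unchanged.
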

\begin{proof}
We only provide a proof sketch here. The detailed proof can be found in \Cref{subsubsec:subsubsec_edges_mult_2_upper}.

For an edge $uv\in E_{\geq2}^a(H)$, we use the fact that $\big|\tilde{\mathbf{Y}}_{uv,E_{\geq 2}(H)}\big|\leq |\mathbf{Y}_{uv}|$, which allows us to upper bound the contribution of $uv$ by $$\mathbb{E}\big[|\mathbf{Y}_{uv}|^{m_{H}(uv)}\big|\mathbf{x}\big] \leq\left(1+\frac{\epsilon \mathbf{x}_u\mathbf{x}_v}{2}\right)\frac{d}{n}+\frac{d^2}{n^2}.$$

For edges in $E_{\geq2}^b(H)$, notice the following:
\begin{itemize}
\item If an edge $uv\in E_{\geq2}^b(H)$ is present in $\mathbf{G}$, then its contribution to the expectation is at most:
\begin{equation}
\label{eq:eq_edge_mult_2_present}
\left(1-\frac{d}{n}\right)^{m_H(uv)}\left(1+\frac{\epsilon \mathbf{x}_u\mathbf{x}_v}{2}\right)\frac{d}{n}\leq \frac{2d}{n}.
\end{equation}
\item If an edge $uv\in E_{\geq2}^b(H)$ is not present in $\mathbf{G}$, then its contribution to the expectation is at most:
\begin{equation}
\label{eq:eq_edge_mult_2_absent}
\left(-\frac{d}{n}\right)^{m_H(uv)}\cdot\left[1-\left(1+\frac{\epsilon \mathbf{x}_u\mathbf{x}_v}{2}\right)\frac{d}{n}\right]=  O\left(\frac{1}{n^2}\right)=\frac{2d}{n}\cdot O\left(\frac{1}{n}\right).
\end{equation}
\end{itemize}
Now notice that every edge in $E_{\geq2}^b(H)$ is incident to some vertex in $\mathcal{L}_{\geq2}(H)$. On the other hand, every vertex $v\in\mathcal{L}_{\geq2}(H)$ is incident to $d^H_{\geq 2}(v)>\Delta$ edges in $E_{\geq2}^b(H)$. These edges cannot all be present in $\mathbf{G}$ without causing truncation. In fact, if more than $\Delta$ of these edges are present in $\mathbf{G}$, then $\tilde{\mathbf{Y}}_{\geq 2}^H=0$, so we can consider only  the cases where at most $\Delta$ of these edges are present in $\mathbf{G}$. This ultimately makes it possible to show that the total contribution of the edges in $E_{\geq2}^b(H)$ that are incident to a vertex $v\in\mathcal{L}_{\geq2}(H)$ is at most
\begin{equation}
\label{eq:eq_L_2_upper_bound_one_vertex}
\tilde{O}\left(\frac{1}{n}\right)^{d^H_{\geq 2}(v)-\Delta}\cdot\left(\frac{2d}{n}\right)^{d_{\geq 2}^H(v)}=\tilde{O}\left(\frac{1}{n}\right)^{d^H_{\geq 2}(v)-\Delta}\cdot\left(\frac{d}{n}\right)^{d_{\geq 2}^H(v)}.
\end{equation}
If there is no edge in $E_{\geq2}^b(H)$ which has both its end-vertices in $\mathcal{L}_{\geq2}(H)$, then we can multiply the upper bounds \cref{eq:eq_L_2_upper_bound_one_vertex} for every $v\in\mathcal{L}_{\geq2}(H)$, and deduce that the total contribution of edges in $E_{\geq2}^b(H)$ can be upper bounded by
\begin{align*}
\prod_{v\in\mathcal{L}_{\geq2}(H)}\left[\tilde{O}\left(\frac{1}{n}\right)^{d^H_{\geq 2}(v)-\Delta}\cdot\left(\frac{d}{n}\right)^{d_{\geq 2}^H(v)}\right]=\frac{1}{\resizebox{0.055\textwidth}{!}{$\displaystyle\prod_{v\in\mathcal{L}_{\geq2}(H)}$}\tilde{\Omega}\left(n^{d^H_{\geq 2}(v)-\Delta}\right)}\cdot\left(\frac{d}{n}\right)^{|E_{\geq2}^b(H)|}.
\end{align*}
However, since it is possible for an edge to have both its end-vertices in $\mathcal{L}_{\geq2}(H)$, we cannot just multiply the upper bounds \cref{eq:eq_L_2_upper_bound_one_vertex} for every $v\in\mathcal{L}_{\geq2}(H)$ because some edges would be counted twice. Instead, it is possible to show that the total contribution of edges in $E_{\geq2}^b(H)$ can be upper bounded by
\begin{align*}
\frac{1}{\resizebox{0.055\textwidth}{!}{$\displaystyle\prod_{v\in\mathcal{L}_{\geq2}(H)}$}\tilde{\Omega}\left(n^{\frac{1}{2}\left(d^H_{\geq 2}(v)-\Delta\right)}\right)}\cdot\left(\frac{d}{n}\right)^{|E_{\geq2}^b(H)|}\leq \frac{1}{\resizebox{0.055\textwidth}{!}{$\displaystyle\prod_{v\in\mathcal{L}_{\geq2}(H)}$}n^{\frac{1}{4}\left(d^H_{\geq 2}(v)-\Delta\right)}}\cdot\left(\frac{d}{n}\right)^{|E_{\geq2}^b(H)|}.
\end{align*}
Combining the contribution of edges in $E_{\geq2}^a(H)$ and edges in $E_{\geq2}^b(H)$, we get
$$\mathbb{E}\big[|\tilde{\mathbf{Y}}_{\geq 2}^H|\big|\mathbf{x}\big]\leq \frac{1}{\resizebox{0.055\textwidth}{!}{$\displaystyle\prod_{v\in\mathcal{L}_{\geq2}(H)}$} n^{\frac{1}{4}\left(d^H_{\geq 2}(v)-\Delta\right)}}\left(\frac{d}{n}\right)^{|\E_{\geq2}(H)|}\prod_{uv\in E_{\geq2}^a(H)}\left[1+\frac{\epsilon \mathbf{x}_u\mathbf{x}_v}{2}+\frac{d}{n}\right],$$
See \Cref{subsubsec:subsubsec_edges_mult_2_upper} for the details.
\end{proof}

\subsubsection{Proof of the upper bound for every multigraph}

Now we are ready to prove the upper bound $\overline{U}_H(\mathbf{x})$ on $\big|\mathbb{E}\big[\overline{\mathbf{Y}}_H|\mathbf{x}\big]\big|$:

\begin{lemma}
\label{lem:lem_UH}
For every multigraph $H$ with at most $st=sK\log n$ vertices and at most $st$ multi-edges, if $\overline{U}_H(\mathbf{x})$ is as in \Cref{def:def_upper_bound_UH} and $n$ is large enough, then
$$\big|\mathbb{E}\big[\overline{\mathbf{Y}}_H|\mathbf{x}\big]\big|\leq \overline{U}_H(\mathbf{x}).$$
\end{lemma}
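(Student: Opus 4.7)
The proof I propose is a direct combination of the two technical lemmas already established in the section, namely Lemma \ref{lem:lem_UH_deg1} (which isolates the contribution of multiplicity-$1$ edges) and Lemma \ref{lem:lem_UH_deg2} (which isolates the contribution of multiplicity-$\geq 2$ edges via the auxiliary variable $\tilde{\mathbf{Y}}_{\geq 2}^H$). The decomposition into these two pieces was precisely designed so that the final bound $\overline{U}_H(\mathbf{x})$ factors into a product of two contributions, one coming from each lemma.

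The plan is as follows. First, I would apply Lemma \ref{lem:lem_UH_deg1} to get
\[
\big|\mathbb{E}\big[\overline{\mathbf{Y}}_H\mid\mathbf{x}\big]\big| \;\leq\; n^{\frac{2K}{A}}\left(\frac{6}{\epsilon}\right)^{|E_1^a(H)|}\left(\frac{\epsilon d}{2n}\right)^{|E_1(H)|}\cdot \mathbb{E}\big[|\tilde{\mathbf{Y}}_{\geq 2}^H|\mid\mathbf{x}\big].
\]
Then I would plug in the bound from Lemma \ref{lem:lem_UH_deg2} on the trailing factor $\mathbb{E}\big[|\tilde{\mathbf{Y}}_{\geq 2}^H|\mid\mathbf{x}\big]$, yielding
\[
\big|\mathbb{E}\big[\overline{\mathbf{Y}}_H\mid\mathbf{x}\big]\big| \;\leq\; n^{\frac{2K}{A}}\left(\frac{6}{\epsilon}\right)^{|E_1^a(H)|}\!\left(\frac{\epsilon d}{2n}\right)^{|E_1(H)|}\!\frac{\left(\tfrac{d}{n}\right)^{|E_{\geq 2}(H)|}}{\prod_{v\in\mathcal{L}_{\geq 2}(H)} n^{\frac{1}{4}(d^H_{\geq 2}(v)-\Delta)}}\prod_{uv\in E_{\geq 2}^a(H)}\!\left[1+\tfrac{\epsilon \mathbf{x}_u\mathbf{x}_v}{2}+\tfrac{d}{n}\right].
\]

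The only remaining discrepancy between this expression and the definition of $\overline{U}_H(\mathbf{x})$ is that the last product contains the term $\frac{d}{n}$ inside each bracket, whereas $\overline{U}_H(\mathbf{x})$ contains $\frac{3d}{\sqrt{n}}$. Since $\frac{d}{n} \leq \frac{3d}{\sqrt{n}}$ for all $n\geq 1$, each bracketed factor can only be increased by the substitution, and so the claimed inequality $\big|\mathbb{E}\big[\overline{\mathbf{Y}}_H\mid\mathbf{x}\big]\big|\leq \overline{U}_H(\mathbf{x})$ follows immediately for $n$ large enough (as required by Lemmas \ref{lem:lem_UH_deg1} and \ref{lem:lem_UH_deg2}).

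Since no new combinatorial or probabilistic argument is needed -- all the hard work on dependencies across edges (safeness, cross-edges, and the truncation-absorption of $(\geq 2)$-large vertices) has been carried out in the two preceding lemmas -- the proof is essentially a one-line chaining of the two bounds plus a trivial monotonicity step. No step is an obstacle; the role of this lemma is simply to package the two contributions into the single upper bound $\overline{U}_H(\mathbf{x})$ that will be used repeatedly in the subsequent trace-method calculations.
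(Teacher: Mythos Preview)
Your proposal is correct and matches the paper's proof essentially verbatim: the paper states that the lemma is a direct corollary of \Cref{lem:lem_UH_deg1} and \Cref{lem:lem_UH_deg2}, together with the fact that $\frac{d}{n}\leq \frac{3d}{\sqrt{n}}$.
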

\begin{proof}
This is a direct corollary of \Cref{lem:lem_UH_deg1} and \Cref{lem:lem_UH_deg2}, and the fact that\footnote{Note that if we put $\frac{d^2}{n^2}$ instead of $\frac{3d^2}{n\sqrt{n}}$ in $\overline{U}_H(\mathbf{x})$, we still get a valid upper bound on $\big|\mathbb{E}\big[\overline{\mathbf{Y}}_H|\mathbf{x}\big]\big|$. We used the term $\frac{3d^2}{n\sqrt{n}}$ because it will be convenient when we upper bound $\mathbb{E}\left[\Tr\left(\big(Q^{(s)}\big(\overline{\mathbf{Y}}\big)-\mathbf{x}\transpose{\mathbf{x}}\big)^t\right)\right]$.} $\frac{d}{n}\leq \frac{3d}{\sqrt{n}}$.
\end{proof}

\subsection{Bounds for nice multigraphs}\label{sec:appendix-bounds-nice-multigraphs}

In this section, we will prove a lower bound on $\mathbb{E}\big[\overline{\mathbf{Y}}_H\big]$ for multigraghs $H$ belonging to a nice family of block self-avoid-avoiding walks.

\begin{definition}
\label{def:def_nice_bsaw}
A block self-avoiding-walk $H$ is said to be \emph{nice} if it satisfies the following:
\begin{itemize}
\item $H$ contains at most one cycle, and it should be formed by edges of multiplicity 1 in $H$. In other words,
\begin{itemize}
\item $E_1(H)$ is either empty or a cycle.\footnote{Hence, if $E_1(H)\neq\varnothing$, then $d^H_{1}(v)=2$ for all $v\in V(E_1(H))$.}
\item The edges of multiplicity $\geq\hspace*{-1.2mm}2$ in $H$ form a forest, i.e., $E_{\geq 2}(H)$ is a forest.
\item There is no path in $E_{\geq 2}(H)$ between any two vertices $u,v\in V(E_1(H))$.
\end{itemize}
\item $\mathcal{L}_{\geq2}(H)=\varnothing$, i.e., $E_{\geq 2}(H)=\mathcal{S}_{\geq 2}(H)\cup \mathcal{I}_{\geq 2}(H)$ and so $d^H_{\geq 2}(v)\leq \Delta$ for all $v\in V(H)$.
\item $|E_1(H)|\geq \frac{t}{A}$.
\end{itemize}
We denote the set of nice $(s,t)$-block self-avoiding-walks as $\nbsaw{s}{t}$.
\end{definition}

\begin{remark}
	%\Tnote{To remove if we manage to restructure proof in section 6}
	The careful reader may notice that the set $\nbsaw{s}{t}$ slightly differs from the definition in \cref{lem:sbm-upperbound-negligible-walks}, due to the additional constraint $|E_1(H)|\geq \frac{t}{A}$. The set of nice block self-avoiding walks with $|E_1(H)|< \frac{t}{A}$ however is negligible as shown in \cref{lem:count-nice-walks-large-degree}.
\end{remark}

In fact, we will provide tight bounds on $\mathbb{E}\big[\overline{\mathbf{Y}}_H\big]$ for a family of multigraphs that is larger than $\nbsaw{s}{t}$. In the following two definitions, we introduce the family of $(s,t)$-pleasant multigraphs.

\begin{definition}
\label{def:def_agreeable_multigraphs}
A multigraph $H$ is said to be \emph{agreeable} if satisfies one of the following three conditions:
\begin{itemize}
\item[(1)] The underlying graph of $G(H)$ is a cycle. In this case, we say that $H$ is \emph{type-1 agreeable}.
\item[(2)] There are two sets of edges $E'(H)\subset E(H)$, $E''(H)\subset E(H)$, and a vertex $u_H\in V(H)$ such that:
\begin{itemize}
\item $E(H)=E'(H)\cup E''(H)$.
\item $E'(H)$ and $E''(H)$ are cycles.
\item $V(E'(H))\cap V(E''(H))=\{u_H\}$.
\end{itemize}
In this case, we say that $H$ is \emph{type-2 agreeable}.%, and we call $u_H$ as the \emph{main node} of $H$.
\item[(3)] There are two sets of edges $E'(H)\subset E(H)$ and $E''(H)\subset E(H)$ such that:
\begin{itemize}
\item $E(H)=E'(H)\cup E''(H)$.
\item $E'(H)$ and $E''(H)$ are cycles.
\item $E'(H)\cap E''(H)\neq\varnothing$ and $V(E'(H))\cap V(E''(H))=V\big(E'(H)\cap E''(H)\big)$.
\item $E'(H)\cap E''(H)$ is a simple path.
\item $E'(H)\cap E''(H)\subset E_{\geq 2}(H)$, i.e., all the edges in $E'(H)\cap E''(H)$ are of multiplicity at least 2 in $H$.
\end{itemize}
In this case, we say that $H$ is \emph{type-3 agreeable}.
\end{itemize}
\end{definition}

\begin{definition}
\label{def:def_pleasant_multigraphs}
A multigraph $H$ is said to be $(s,t)$-\emph{pleasant} if it satisfies the following conditions:
\begin{itemize}
\item $H$ contains at most $st$ vertices and at most $st$ multi-edges.
\item $\mathcal{L}_{\geq 2}(H)=\varnothing$.
\item There are $r_H$ sub-multigraphs $H^{(1)},\ldots,H^{(r_H)}$ of $H$ such that:
\begin{itemize}
\item For every $i\in [r_H]$, $H^{(i)}$ is an induced sub-multigraph of $H$, i.e., $H^{(i)} = H\big(V\big(H^{(i)}\big)\big)$.
\item $H^{(1)},\ldots,H^{(r_H)}$ are vertex-disjoint, i.e., $V\big(H^{(1)}\big),\ldots,V\big(H^{(r_H)}\big)$ are mutually disjoint.
\item $H^{(1)},\ldots,H^{(r_H)}$ are agreeable.
\item $\displaystyle E_1(H)= \bigcup_{i\in [r_H]}E_1\big(H^{(i)}\big)=E_1\big(H^{(\ast)}\big)$, where $\displaystyle H^{(\ast)}=\bigcup_{i\in [r_H]}H^{(i)}$.\footnote{This means that all edges in $H-H^{(\ast)}$ are of multiplicity at least 2.}
\item The only cycles in $H$ are those inside $\displaystyle H^{(\ast)}=\bigcup_{i\in [r_H]}H^{(i)}$.\footnote{This means that if we contract $H^{(1)},\ldots,H^{(r_H)}$ into $r_H$ vertices, $H$ becomes a forest.}
\end{itemize}
\item Every cycle in $H$ contains at least $\frac{t}{A}$ multiplicity-1 edges.
\end{itemize}
The sub-multigraphs $H^{(1)},\ldots,H^{(r_H)}$ are said to be the \emph{agreeable components} of $H$.

It is easy to see that every nice $(s,t)$-block self-avoiding-walk is $(s,t)$-pleasant.
\end{definition}

\subsubsection{Informal discussion and proof strategy}

In order to be able to show tights bound on $\mathbb{E}\big[\overline{\mathbf{Y}}_H\big]$ for an $(s,t)$-pleasant multigraph $H$, we need to be more precise in our calculations. The techniques that were developed in \Cref{subsec:subsec_upper_bound_H} will be useful, but we need more ideas in order to get precise calculations that allow for a proof of tight bounds. In the following few paragraphs, we will informally describe how we will prove the tight bounds on $\mathbb{E}\big[\overline{\mathbf{Y}}_H\big]$ for a pleasant multigraph $H$.

Roughly speaking, if we write the exact expression of $\mathbb{E}\big[\overline{\mathbf{Y}}_H\big|\mathbf{x}\big]$, we will get a very complicated polynomial $g(\mathbf{x})$ of $\mathbf{x}$. We can decompose the complicated polynomial $g(\mathbf{x})$ into positive terms and negative terms, and we might hope to get a lower bound on $\mathbb{E}\big[\overline{\mathbf{Y}}_H\big|\mathbf{x}\big]$ by showing that the negative terms are negligible. However, it does not seem that we can easily show that the negative monomials are negligible.\footnote{In fact, this might not even be possible. It might be the case that the negative terms are not negligible, but the total aggregate of the positive terms is more important than the total aggregate of the negative terms. For example, consider $n=5n-4n$: While the negative term $4n$ is not negligible with respect to the positive term $5n$, the positive term is more important. Something like this occurs in $\mathbb{E}\big[\overline{\mathbf{Y}}_H\big|\mathbf{x}\big]$: In order to see this, consider the case of a single edge of multiplicity 1, and suppose that $\epsilon$ is very small.}

Instead of proving a lower bound on $\mathbb{E}\big[\overline{\mathbf{Y}}_H\big|\mathbf{x}\big]=g(\mathbf{x})$, we will prove a lower bound on $\mathbb{E}\big[\overline{\mathbf{Y}}_H\big]=\mathbb{E}[g(\mathbf{x})]$. The main reason why we considered lower bounding $\mathbb{E}\big[\overline{\mathbf{Y}}_H\big]=\mathbb{E}[g(\mathbf{x})]$ instead of $\mathbb{E}\big[\overline{\mathbf{Y}}_H\big|\mathbf{x}\big]=g(\mathbf{x})$ is that $\mathbb{E}[\mathbf{x}_u\mathbf{x}_v]=0$ for every edge $uv$. This property implies  that the expectation of the vast majority of the monomials that appear in $g(\mathbf{x})$ is actually zero. In fact, it is possible to show that for any set $\mathsf{E}$ of edges, we have $\displaystyle\mathbb{E}\left[\prod_{uv\in \mathsf{E}} \mathbf{x}_u\mathbf{x}_v \right]\neq 0$ if and only if $\mathsf{E}$ is the disjoint union of cycles, in which case we have $\displaystyle\mathbb{E}\left[\prod_{uv\in \mathsf{E}} \mathbf{x}_u\mathbf{x}_v \right]=1$. So by computing $\mathbb{E}\big[\overline{\mathbf{Y}}_H\big]=\mathbb{E}[g(\mathbf{x})]$, we can get rid of the vast majority of the terms in $g(\mathbf{x})$. This is the main reason why analyzing $\mathbb{E}\big[\overline{\mathbf{Y}}_H\big]=\mathbb{E}[g(\mathbf{x})]$ is much simpler than analyzing $\mathbb{E}\big[\overline{\mathbf{Y}}_H\big|\mathbf{x}\big]=g(\mathbf{x})$.

The following lemma shows that the nice structure of pleasant multigraphs makes the behavior of the expectation of monomials in $(\mathbf{x}_u\mathbf{x}_v)_{uv\in E(H)}$ very simple:

\begin{lemma}
\label{lem:lem_Expectation_Y_Nice}
Let $H$ be an arbitrary multigraph and let $\mathsf{E}\subset E(H)$. We have:
\begin{itemize}
\item If $\mathsf{E}$ is an edge-disjoint union of cycles, then $\displaystyle\mathbb{E}\left[\prod_{uv\in \mathsf{E}} \mathbf{x}_u\mathbf{x}_v \right]=1$.
\item If $\mathsf{E}$ is not an edge-disjoint union of cycles, then $\displaystyle\mathbb{E}\left[\prod_{uv\in \mathsf{E}} \mathbf{x}_u\mathbf{x}_v \right]=0$.
\end{itemize}
\end{lemma}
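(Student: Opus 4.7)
The plan is to exploit two facts: first, that the components of $\mathbf{x}$ are i.i.d.\ uniform in $\{\pm 1\}$ and hence $\mathbb{E}[\mathbf{x}_v^k]$ equals $1$ when $k$ is even and $0$ when $k$ is odd; second, that a graph in which every vertex has even degree decomposes (on each connected component) into an edge-disjoint union of cycles. I will bridge these via the standard ``regrouping by vertex'' identity
\[
\prod_{uv\in\mathsf{E}}\mathbf{x}_u\mathbf{x}_v=\prod_{v\in V(H)}\mathbf{x}_v^{d_\mathsf{E}(v)},
\]
where $d_\mathsf{E}(v)$ denotes the degree of $v$ in the (simple) subgraph spanned by $\mathsf{E}$. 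The point is that each edge $uv\in\mathsf{E}$ contributes one factor of $\mathbf{x}_u$ and one of $\mathbf{x}_v$, so the exponent of $\mathbf{x}_v$ in the product is precisely $d_\mathsf{E}(v)$.

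Then I would compute, using independence across $v$ and the fact that $\mathbf{x}_v^2=1$,
\[
\mathbb{E}\Bigl[\prod_{uv\in\mathsf{E}}\mathbf{x}_u\mathbf{x}_v\Bigr]=\prod_{v\in V(H)}\mathbb{E}\bigl[\mathbf{x}_v^{d_\mathsf{E}(v)}\bigr]=\prod_{v\in V(H)}\mathbf{1}\bigl[d_\mathsf{E}(v)\text{ even}\bigr].
\]
So the expectation equals $1$ if every vertex has even degree in the subgraph induced by $\mathsf{E}$, and it equals $0$ otherwise.

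It remains to identify ``every vertex has even degree in $\mathsf{E}$'' with ``$\mathsf{E}$ is an edge-disjoint union of cycles.'' One direction is immediate: in an edge-disjoint union of cycles, each vertex belongs to some number of the cycles, each of which contributes $2$ to its degree, so all degrees are even. For the other direction, I would cite (or quickly reprove) the standard folklore lemma that a graph in which every vertex has even degree admits an edge-disjoint cycle decomposition: pick any vertex of positive degree, walk without repeating edges, and since every time one enters a vertex there is an unused edge to leave by (parity), the walk must eventually return to the starting vertex, yielding a cycle; remove this cycle and iterate on the remaining (still even-degree) graph.

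I do not foresee a real obstacle here; the only thing to be careful about is that $\mathsf{E}$ is viewed as a set of distinct edges (the statement is about $\mathsf{E}\subset E(H)$, which is a simple edge set even though $H$ may be a multigraph), so the ``edge-disjoint union of cycles'' characterization applies unambiguously to the simple graph $(V(H),\mathsf{E})$.
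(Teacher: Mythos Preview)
Your proposal is correct and follows essentially the same approach as the paper: regroup the product by vertex, use that the $\mathbf{x}_v$ are i.i.d.\ Rademacher to reduce to a parity condition on the degrees $d_{\mathsf{E}}(v)$, and then identify ``all degrees even'' with ``edge-disjoint union of cycles.'' If anything, you supply slightly more detail than the paper does for the latter equivalence.
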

\begin{proof}
We have:
\begin{align*}
\prod_{uv\in \mathsf{E}} \mathbf{x}_u\mathbf{x}_v&=\prod_{v\in V(\mathsf{E})} \mathbf{x}_v^{d_{\mathsf{E}}(v)},
\end{align*}
where $$d_{\mathsf{E}}(v)=\big|\big\{u\in V(H):\; uv\in\mathsf{E}\big\}\big|.$$

Since $(\mathbf{x}_v)_{v\in V(H)}$ are i.i.d. Rademacher random variables, it follows that
$$\mathbb{E}\left[\prod_{uv\in \mathsf{E}} \mathbf{x}_u\mathbf{x}_v\right]=
\begin{cases}
1\quad&\text{if }d_{\mathsf{E}}(v)\text{ is even for all }v\in V(\mathsf{E}),\\
0\quad&\text{if there exists at least one vertex }v\in V(\mathsf{E})\text{ such that }d_{\mathsf{E}}(v)\text{ is odd}.\\
\end{cases}
$$

Now notice the following:
\begin{itemize}
\item If $\mathsf{E}$ is an edge-disjoint union of cycles, then $d_{\mathsf{E}}(v)$ is even for every $v\in V(\mathsf{E})$, and so $\displaystyle\mathbb{E}\left[\prod_{uv\in \mathsf{E}} \mathbf{x}_u\mathbf{x}_v \right]=1$.
\item If $\mathsf{E}$ is not an edge-disjoint union of cycles, then there must exist one vertex $v\in V(\mathsf{E})$ such that $d_{\mathsf{E}}(v)$ is odd, and so $\displaystyle\mathbb{E}\left[\prod_{uv\in \mathsf{E}} \mathbf{x}_u\mathbf{x}_v \right]=0$.
\end{itemize}
\end{proof}

As can be seen from \Cref{lem:lem_Expectation_Y_Nice}, if $H$ is a pleasant multigraph, the behavior of the expectation of monomials in $(\mathbf{x}_u\mathbf{x}_v)_{uv\in E(H)}$ is very simple: Only the monomials corresponding to $\bigcup_{i\in I}C_i(H)$ for some $I\subseteq [z]$ have nonzero expectation, where $C_1(H),\ldots,C_z(H)$ are the cycles of $E_1(H)$. If we could write $\mathbb{E}\big[\overline{\mathbf{Y}}_H\big|\mathbf{x}\big]=g(\mathbf{x})$ as a polynomial that only depends on $(\mathbf{x}_u\mathbf{x}_v)_{uv\in G(H)}$, the expectation $\mathbb{E}\big[\overline{\mathbf{Y}}_H\big]=\mathbb{E}[g(\mathbf{x})]$ would be very simple. Unfortunately, this is not the case because of truncation: The presence or absence of edges outside $E(H)$ in $\mathbf{G}$ makes $g(\mathbf{x})$ also depend on $(\mathbf{x}_u\mathbf{x}_v)_{uv\notin E(H)}$. Therefore, even if we use the fact that for every set $\mathsf{E}$ of edges\footnote{Here $\mathsf{E}$ may or may not be  a subset of $E(H)$.}, $\displaystyle\mathbb{E}\left[\prod_{uv\in \mathsf{E}} \mathbf{x}_u\mathbf{x}_v \right]\neq 0$ if and only if $\mathsf{E}$ is a disjoint union of cycles, the expression of $\mathbb{E}[g(\mathbf{x})]$ is still too complicated to analyze. In fact, we have two complications:
\begin{itemize}
\item $H$-cross-edges\footnote{Recall that an $H$-cross-edge is an edge $uv$ such that $u,v\in V(H)$ and $uv\notin E(H)$.} makes it possible to have monomials in $g(\mathbf{x})$ corresponding to cycles in $V(H)$, which are different than $E_1(H)$.
\item Edges from $V(H)$ to $[n]\setminus V(H)$ makes it possible to have monomials in $g(\mathbf{x})$ corresponding to cycles that include vertices from $V(H)$ and vertices from $[n]\setminus V(H)$.
\end{itemize}

The first complication is not too severe because we have very few $H$-cross-edges, namely $O(s^2t^2)=\tilde{O}(1)$ $H$-cross-edges, and the probability of any particular edge being present in $\mathbf{G}$ is $O\left(\frac{1}{n}\right)$. Therefore, the event that at least one $H$-cross-edge is present in $\mathbf{G}$ has negligible probability, and we can assume that no $H$-cross-edge is present in $\mathbf{G}$. This does not completely solve the first complication, because the probability that an $H$-cross-edge $uv$ is not present in $\mathbf{G}$ is 
$$\mathbb{P}[uv\notin\mathbf{G}]=1-\left(1+\frac{\epsilon \mathbf{x}_u\mathbf{x}_v}{2}\right)\frac{d}{n},$$ which contains a term that depends on $\mathbf{x}_u\mathbf{x}_v$. Therefore, even if we focus on the event that no $H$-cross-edge is present in $\mathbf{G}$, we still have monomials that depend on $(\mathbf{x}_u\mathbf{x}_v)_{uv\text{ is an }H\text{-cross-edge}}$. Fortunately, the term in $\mathbb{P}[uv\notin\mathbf{G}]$ that depends on $\mathbf{x}_u\mathbf{x}_v$ is negligible with respect to the constant term that does not depend on $\mathbf{x}_u\mathbf{x}_v$. We can leverage this observation to show that the total contribution of monomials containing $H$-cross-edges is negligible.

The second complication is a bit trickier to overcome. We know that the truncation event depends only on $\big(d_{\mathbf{G}}(v)\big)_{v\in V(H)}$, and so depends only on $\big(d_{\mathbf{G}-G(H)}(v)\big)_{v\in V(H)}$ and $(\mathbf{Y}_{uv})_{uv\in E(H)}$. Only $\big(d_{\mathbf{G}-G(H)}(v)\big)_{v\in V(H)}$ is problematic for us because it will lead to terms that depend on $(\mathbf{x}_u\mathbf{x}_v)_{uv\notin E(H)}$. In order to analyze the conditional expectation of $\overline{\mathbf{Y}}_H$ given $\mathbf{x}$ and given that no $H$-cross-edge is present in $\mathbf{G}$, we will further condition over $\big(d_{\mathbf{G}-G(H)}(v)\big)_{v\in V(H)}$. If no $H$-cross-edge is present in $\mathbf{G}$, then for every $v\in V(H)$, we have
$$d_{\mathbf{G}-G(H)}(v)=d_{\mathbf{G}-G(H)}^o(v):=\big|\big\{uv\in\mathbf{G}:\; u\notin V(H)\big\}\big|.$$
The nice thing about $\big(d_{\mathbf{G}-G(H)}^o(v)\big)_{v\in V(H)}$ is that, unlike $\big(d_{\mathbf{G}-G(H)}(v)\big)_{v\in V(H)}$, the random variables $\big(d_{\mathbf{G}-G(H)}^o(v)\big)_{v\in V(H)}$ are conditionally mutually independent given $\mathbf{x}$.\footnote{Because of $H$-cross-edges, the random variables $\big(d_{\mathbf{G}-G(H)}(v)\big)_{v\in V(H)}$ are not conditionally mutually independent given $\mathbf{x}$.} Therefore, for every $(\mathsf{d}_v)_{v\in V(H)}\in\mathbb{N}^{V(H)}$, we have
$$\mathbb{P}\big[\big\{\forall v\in V(H), d_{\mathbf{G}-G(H)}^o(v)=\mathsf{d}_v\big\}\big|\mathbf{x}\big]=\prod_{v\in V(H)}\mathbb{P}\big[d_{\mathbf{G}-G(H)}^o(v)=\mathsf{d}_v\big|\mathbf{x}\big],$$
and for each $v\in V(H)$, we have
\begin{align*}
\mathbb{P}\big[d_{\mathbf{G}-G(H)}^o(v)=\mathsf{d}_v\big|\mathbf{x}\big]&=\sum_{\substack{U \subseteq[n]\setminus V(H):\\|S|=\mathsf{d}_v}}\left[\prod_{u\in U}\mathbb{P}[uv\in \mathbf{G}|\mathbf{x}]\right]\cdot\left[ \prod_{u\in [n]\setminus (U\cup V(H))}\mathbb{P}[uv\notin \mathbf{G}|\mathbf{x}]\right]\\
&=\resizebox{0.7\textwidth}{!}{$\displaystyle\sum_{\substack{U \subseteq[n]\setminus V(H):\\|S|=\mathsf{d}_v}}\left[\prod_{u\in U}\left(1+\frac{\epsilon \mathbf{x}_u\mathbf{x}_v}{2}\right)\frac{d}{n}\right]\cdot\left[ \prod_{u\in [n]\setminus (U\cup V(H))}\left[1-\left(1+\frac{\epsilon \mathbf{x}_u\mathbf{x}_v}{2}\right)\frac{d}{n}\right]\right]$}.
\end{align*}
The second complication that we mentioned comes from the fact that for every $v\in V(H)$, the probability $\mathbb{P}\big[d_{\mathbf{G}-G(H)}^o(v)=\mathsf{d}_v\big|\mathbf{x}\big]$ depends on $(\mathbf{x}_u\mathbf{x}_v)_{u\in [n]\setminus V(H)}$, as can be seen from the above equation. Now here comes the crucial observation that allows us to overcome this complication: If $\mathbf{x}$ is balanced on $[n]\setminus V(H)$ in the sense that $[n]\setminus V(H)$ contains an equal number of vertices from each community, then $\mathbb{P}\big[d_{\mathbf{G}-G(H)}^o(v)=\mathsf{d}_v\big|\mathbf{x}\big]$ will not depend on $(\mathbf{x}_u\mathbf{x}_v)_{u\in[n]\notin V(H)}$, and it will only be a function of  $\mathsf{d}_v$.

Unfortunately, $[n]\setminus V(H)$ is not likely to be balanced. Nevertheless, from Hoeffding inequality it can be easily seen that $[n]\setminus V(H)$ is very likely to be approximately balanced. More precisely, if $\alpha>0$ is a fixed (but small) constant, then with high probability we can find a subset $V_b(H,\mathbf{x})$ of $[n]\setminus V(H)$ that is exactly balanced, i.e., it contains an equal number of vertices from each community, and such that $$\big|[n]\setminus \big(V(H)\cup V_b(H,\mathbf{x})\big)\big|\leq n^{\frac{1}{2}+\alpha}.$$
Notice that there are at most $st\cdot n^{\frac{1}{2}+\alpha}=\tilde{O}\big(n^{\frac{1}{2}+\alpha}\big)=o(n)$ edges from $V(H)$ to $[n]\setminus \big(V(H)\cup V_b(H,\mathbf{x})\big)$. On the other hand, the probability that any particular one of these edges is present in $\mathbf{G}$ is at most $O\left(\frac{1}{n}\right)$, hence, with high probability, none of these edges will be present in $\mathbf{G}$. Therefore, we can treat these edges exactly as we treated the $H$-cross-edges, i.e., we can assume that none of them will be present in $\mathbf{G}$. Furthermore, for any particular edge $uv$ between $u\in [n]\setminus \big(V(H)\cup V_b(H,\mathbf{x})\big)$ and $v\in V(H)$, the term that contains $\mathbf{x}_u\mathbf{x}_v$ in $\mathbb{P}[uv\notin\mathbf{G}]$ is negligible with respect to the constant term that does not depend on $\mathbf{x}_u\mathbf{x}_v$. This means that we can completely discard the edges between $V(H)$ and $[n]\setminus \big(V(H)\cup V_b(H,\mathbf{x})\big)$ exactly as we did with $H$-cross-edges.

Now since $V_b(H,\mathbf{x})$ is exactly balanced, we can see that the remaining polynomial will contain monomials that depend only on $(\mathbf{x}_u\mathbf{x}_v)_{uv\in E(H)}$. By computing the expectation, many terms will disappear and we will get a simple expression that is very easy to analyze, and this will eventually yield tight bounds on $\mathbb{E}\big[\overline{\mathbf{Y}}_H\big]$.

In the following, we will turn the above informal discussion into a formal proof.

%\subsubsection{The contribution of the non-approximately balanced event}
%
%Let $\mathcal{E}_{H,b}$ be the event that $\mathbf{x}$ is approximately balanced on $[n]\setminus V(H)$. We will decompose $\mathbb{E}\big[\overline{\mathbf{Y}}_H\big]$ into two parts by conditioning on the event $\mathcal{E}_{H,b}$:
%$$\mathbb{E}\big[\overline{\mathbf{Y}}_H\big]=\mathbb{E}\big[\overline{\mathbf{Y}}_H\big|\mathcal{E}_{H,\text{app-bal}}\big]\cdot\mathbb{P}[\mathcal{E}_{H,b}]+\mathbb{E}\big[\overline{\mathbf{Y}}_H\big|\mathcal{E}_{H,b}^c\big]\cdot\mathbb{P}[\mathcal{E}_{H,b}^c].$$
%
%We will prove a lower bound on $\mathbb{E}\big[\overline{\mathbf{Y}}_H\big|\mathcal{E}_{H,b}\big]\cdot\mathbb{P}[\mathcal{E}_{H,b}]$. On the other hand, we will prove an upper bound on $\left|\mathbb{E}\big[\overline{\mathbf{Y}}_H\big|\mathcal{E}_{H,b}^c\big]\cdot\mathbb{P}[\mathcal{E}_{H,b}^c]\right|$, which implies that it is negligible with respect to $\mathbb{E}\big[\overline{\mathbf{Y}}_H\big|\mathcal{E}_{H,b}\big]\cdot\mathbb{P}[\mathcal{E}_{H,b}]$.
%
%The following lemma provides an upper bound on the contribution of the event $\mathcal{E}_{H,b}^c$:
%
%In order to study $\mathbb{E}\big[\overline{\mathbf{Y}}_H\big|\mathcal{E}_{H,b}\big]\cdot\mathbb{P}[\mathcal{E}_{H,b}]$, we will introduce a convenient event that we call "the well-behaved event".

\subsubsection{The well-behaved event}

\label{subsubsec:subsubsec_well_behaved_even_V_b}

In the following, we assume that there is a fixed ordering\footnote{We can use the total order that is induced by the names of the vertices as integers between $1$ and $n$, e.g., $3\leq 5$ and $4\leq 17$.} of the vertices $\{1,\ldots,n\}$ of $\mathbf{G}$. This ordering will be used to define some useful concepts. We emphasize that this ordering can be arbitrary, but it should not dependent on the (random) SBM sample $(\mathbf{G},\mathbf{x})$.
\begin{definition}
Let $H$ be a multigraph with at most $st=sK\log n$ vertices. We say that $\mathbf{x}$ is \emph{approximately balanced on $[n]\setminus V(H)$} if there are at least $\left\lceil\frac{n}{2}- n^{\frac{3}{4}}\right\rceil$ vertices from $[n]\setminus V(H)$ in the first community, and at least $\left\lceil\frac{n}{2}- n^{\frac{3}{4}}\right\rceil$ vertices from $[n]\setminus V(H)$ in the second community, i.e.,
$$\big|\big\{v\in [n]\setminus V(H):\; \mathbf{x}_v=+1\big\}\big|\geq \left\lceil\frac{n}{2}- n^{\frac{3}{4}}\right\rceil,$$
and
$$\big|\big\{v\in [n]\setminus V(H):\; \mathbf{x}_v=-1\big\}\big|\geq \left\lceil\frac{n}{2}- n^{\frac{3}{4}}\right\rceil.$$

Now assume that $\mathbf{x}$ is approximately balanced on $[n]\setminus V(H)$. Let $V_{b}(H,\mathbf{x})$ be the set containing the first\footnote{Here the vertices are chosen according to the fixed ordering of $V(\mathbf{G})=[n]$.} $\left\lceil\frac{n}{2}- n^{\frac{3}{4}}\right\rceil$ vertices in $[n]\setminus V(H)$ of the first community and the first $\left\lceil\frac{n}{2}- n^{\frac{3}{4}}\right\rceil$ vertices in $[n]\setminus V(H)$ of the second community, i.e.,
\begin{equation}
\label{eq:eq_def_V_b}
V_b(H,\mathbf{x})=\Big\{v\in [n]\setminus V(H):\;\big|\big\{u\in [n]\setminus V(H):\; u\leq v,\; \mathbf{x}_u=\mathbf{x}_v\big\}\big|\leq \left\lceil\frac{n}{2}- n^{\frac{3}{4}}\right\rceil\Big\}.
\end{equation}

If $\mathbf{x}$ is not approximately balanced, we still define $V_b(H,\mathbf{x})$ as in \cref{eq:eq_def_V_b}, but now $V_b(H,\mathbf{x})$ contains at most $2\left\lceil \frac{n}{2}-n^{\frac{3}{4}}\right\rceil-1$ vertices, and $V_b(H,\mathbf{x})$ would not necessarily be exactly balanced.
\end{definition}

\begin{remark}
If $\mathbf{x}$ is approximately balanced on $[n]\setminus V(H)$ then:
\begin{itemize}
\item $|V_{b}(H,\mathbf{x})|=2\left\lceil \frac{n}{2}-n^{\frac{3}{4}}\right\rceil$.
\item $V_{b}(H,\mathbf{x})$ contains exactly $\left\lceil\frac{n}{2}- n^{\frac{3}{4}}\right\rceil$ vertices of the first community and $\left\lceil\frac{n}{2}- n^{\frac{3}{4}}\right\rceil$ vertices of the second community.
\item $\big|\big([n]\setminus V(H)\big)\setminus V_b(H,\mathbf{x})|\leq \big|[n]\setminus V_{b}(H,\mathbf{x})\big|\leq 2 n^{\frac{3}{4}}$.
\end{itemize}
\end{remark}

\begin{definition}
\label{def:def_well_behaved_H}
Let $H$ be a multigraph with at most $st=sK\log n$ vertices. We say that $(\mathbf{G},\mathbf{x})$ is $H$-\emph{well-behaved} if:
\begin{itemize}
\item[(1)] $\mathbf{x}$ is approximately balanced on $[n]\setminus V(H)$.
\item[(2)] In the sampled graph $\mathbf{G}$, no vertex in $V(H)$ is adjacent to any vertex in $\big([n]\setminus V(H)\big)\setminus V_b(H,\mathbf{x})$.
\item[(3)] There is no $H$-cross-edge that is present in $\mathbf{G}$.
\item[(4)] All edges in $E_{\geq 2}(H)$ are present\footnote{Note that it is possible to show tight bounds without adding Condition (4) to the definition of the well-behaved event. We only added this condition because it makes the proof of the bounds simpler and easier to describe.} in $\mathbf{G}$.
\end{itemize}
We denote the event that $(\mathbf{G},\mathbf{x})$ is $H$-well-behaved as $\mathcal{E}_{wb,H}$, i.e.,
$$\mathcal{E}_{wb,H}=\big\{(\mathbf{G},\mathbf{x})\text{ is $H$-well-behaved}\big\}.$$
\end{definition}

We will decompose $\mathbb{E}\big[\overline{\mathbf{Y}}_H]$ into two parts by conditioning on the event $\mathcal{E}_{wb,H}$:
$$\mathbb{E}\big[\overline{\mathbf{Y}}_H\big]=\mathbb{E}\big[\overline{\mathbf{Y}}_H\big|\mathcal{E}_{wb,H}\big]\cdot\mathbb{P}[\mathcal{E}_{wb,H}]+\mathbb{E}\big[\overline{\mathbf{Y}}_H\big|\mathcal{E}_{wb,H}^c\big]\cdot\mathbb{P}[\mathcal{E}_{wb,H}^c].$$
We will prove tight bounds on $\mathbb{E}\big[\overline{\mathbf{Y}}_H\big|\mathcal{E}_{wb,H}\big]\cdot\mathbb{P}[\mathcal{E}_{wb,H}]$ and an upper bound on $\left|\mathbb{E}\big[\overline{\mathbf{Y}}_H\big|\mathcal{E}_{wb,H}^c\big]\cdot\mathbb{P}[\mathcal{E}_{wb,H}^c]\right|$. These bounds will imply that the contribution of the not-well-behaved event is negligible with respect to that of the well-behaved event.

\subsubsection{Upper bound on the contribution of the not-well-behaved event}

The following lemma provides an upper bound on the total contribution of the not-well-behaved event in the expectation of $\overline{\mathbf{Y}}_H$.

\begin{lemma}
\label{lem:lem_upper_bound_nice_Wc}
Let $H$ be a multigraph with at most $st=sK\log n$ vertices and at most $st$ multi-edges. Assume that $\mathcal{L}_1(H)=\mathcal{L}_{\geq 2}(H)=\varnothing$, and that $E_{\geq 2}(H)$ forms a forest. If $A>\max\{100K,1\}$ and $n$ is large enough, then we have
$$\left|\mathbb{E}\big[\overline{\mathbf{Y}}_H\big|\mathcal{E}_{wb,H}^c\big]\cdot\mathbb{P}[\mathcal{E}_{wb,H}^c]\right|\leq \frac{2}{n^{\frac{1}{6}}}\cdot \left(\frac{\epsilon d}{2n}\right)^{|E_1(H)|}\cdot\left(\frac{d}{n}\right)^{|E_{\geq 2}(H)|}.$$
\end{lemma}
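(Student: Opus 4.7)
The plan is to decompose the bad event $\mathcal{E}_{wb,H}^c$ into the union of the four sub-events $E_1', E_2', E_3', E_4'$ corresponding to the failure of each of the four conditions in Definition \ref{def:def_well_behaved_H}: $E_1'$ is the event that $\mathbf{x}$ is not approximately balanced on $[n]\setminus V(H)$; $E_2'$ is the event that some vertex of $V(H)$ is $\mathbf{G}$-adjacent to $[n]\setminus(V(H)\cup V_b(H,\mathbf{x}))$; $E_3'$ is the event that some $H$-cross-edge is present in $\mathbf{G}$; $E_4'$ is the event that some edge of $E_{\geq 2}(H)$ is absent from $\mathbf{G}$. The union bound reduces the task to controlling $|\mathbb{E}[\overline{\mathbf{Y}}_H\mathbbm{1}_{E_i'}]|$ separately for each $i$. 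The reference scale throughout is $\mathbb{E}[\overline{U}_H(\mathbf{x})]$; under the hypothesis $\mathcal{L}_1(H)=\mathcal{L}_{\geq 2}(H)=\varnothing$ and the fact that $E_{\geq 2}(H)$ is a forest, Lemma \ref{lem:lem_Expectation_Y_Nice} makes all cross-terms in the product $\prod_{uv\in E_{\geq 2}(H)}[1+\epsilon\mathbf{x}_u\mathbf{x}_v/2+3d/\sqrt{n}]$ vanish in expectation, so $\mathbb{E}[\overline{U}_H(\mathbf{x})]=(1+o(1))\,n^{2K/A}(\epsilon d/(2n))^{|E_1(H)|}(d/n)^{|E_{\geq 2}(H)|}$, and the prefactor $n^{2K/A}$ is $\leq n^{o(1)}$ for $A>100K$.

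Event $E_1'$ is immediate: Hoeffding gives $\mathbb{P}(E_1')\leq 2e^{-2\sqrt{n}}$, and since $\mathbbm{1}_{E_1'}$ is a function of $\mathbf{x}$ alone, Lemma \ref{lem:lem_UH} combined with the deterministic bound $\overline{U}_H(\mathbf{x})\leq n^{O(sK\log n)}(\epsilon d/(2n))^{|E_1(H)|}(d/n)^{|E_{\geq 2}(H)|}$ gives a super-polynomially small contribution. For $E_4'$ I would take a union bound over edges $e^\ast\in E_{\geq 2}(H)$: forcing $e^\ast\notin\mathbf{G}$ replaces $\mathbf{Y}_{e^\ast}^{m_H(e^\ast)}$ by the deterministic value $(-d/n)^{m_H(e^\ast)}$, of magnitude at most $(d/n)^2$, which gains an extra factor of $d/n$ relative to the benchmark $\mathbb{E}[\overline{\mathbf{Y}}_{e^\ast}^{m_H(e^\ast)}|\mathbf{x}]\asymp d/n$ used in the proof of Lemma \ref{lem:lem_UH}. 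Re-running that proof with $e^\ast$'s contribution replaced yields $|\mathbb{E}[\overline{\mathbf{Y}}_H\mathbbm{1}_{e^\ast\notin\mathbf{G}}]|\leq (d/n)\,\mathbb{E}[\overline{U}_H(\mathbf{x})]$, and summing over $|E_{\geq 2}(H)|\leq st$ gives $O(st\cdot d/n)=o(n^{-1/2+o(1)})$ times the target.

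Events $E_3'$ and $E_2'$ are handled by the same device: write $\mathbbm{1}_{uv\in\mathbf{G}}=\mathbf{Y}_{uv}+d/n$, so $\mathbb{E}[\overline{\mathbf{Y}}_H\mathbbm{1}_{uv\in\mathbf{G}}]=\mathbb{E}[\overline{\mathbf{Y}}_H\mathbf{Y}_{uv}]+(d/n)\mathbb{E}[\overline{\mathbf{Y}}_H]$. For a cross-edge $uv$ (both endpoints in $V(H)$) the truncation event is unchanged, so $\overline{\mathbf{Y}}_H\mathbf{Y}_{uv}=\overline{\mathbf{Y}}_{H+uv}$ and Lemma \ref{lem:lem_UH} applied to $H+uv$ (one extra multiplicity-$1$ edge, at worst a bounded constant number of new annoying edges since each $v\in V(H)$ has $d_1^H(v)\leq\tau$) contributes an extra $O(\epsilon d/(2n))$; summing over $|C_H|\leq (st)^2$ gives a total of $O((st)^2 d/n)=o(n^{-1/2+o(1)})$ times the target. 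For $E_2'$, restricting to approximately balanced $\mathbf{x}$ gives $|V_b^c(H,\mathbf{x})\setminus V(H)|\leq 2n^{3/4}$, so at most $2st\cdot n^{3/4}$ candidate edges $uv$ with $u\in V(H),v\notin V(H)$. Now $V(H+uv)=V(H)\cup\{v\}$ and the identity $\overline{\mathbf{Y}}_H\mathbf{Y}_{uv}=\overline{\mathbf{Y}}_{H+uv}+\mathbf{Y}_{H+uv}\mathbbm{1}_{\mathcal{E}_H^c}\mathbbm{1}_{\{d_{\mathbf{G}}(v)>\Delta\}}$ splits the term into a piece controlled by Lemma \ref{lem:lem_UH} on $H+uv$ and a truncation-of-$v$ piece whose indicator has conditional probability at most the $\eta$ of Lemma \ref{lem:lem_bound_prob_large_outside_deg} (by the same Chernoff computation, since $\Delta$ dominates $d$). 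Each candidate edge contributes $O(d/n)$, and summing gives the dominant $O(n^{3/4}\cdot d/n)=O(n^{-1/4+o(1)})$ times the target.

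The main obstacle is preserving the $\epsilon$ factor in the multiplicity-$1$ edges: the crude bound $|\overline{\mathbf{Y}}_H|\leq|\mathbf{Y}_H|$ replaces each $\epsilon d/(2n)$ by $2d/n$, losing a super-polynomial factor $(4/\epsilon)^{|E_1(H)|}$ since $|E_1(H)|$ can be $\Theta(\log n)$. Avoiding this loss forces us to keep the signed cancellation of Lemma \ref{lem:lem_UH} throughout, which is why the decomposition $\mathbbm{1}_{uv\in\mathbf{G}}=\mathbf{Y}_{uv}+d/n$ and the augmented multigraph $H+uv$ are essential; the hypothesis $\mathcal{L}_1(H)=\mathcal{L}_{\geq 2}(H)=\varnothing$ is what guarantees that a single new edge changes $\overline{U}$ only by a bounded multiplicative constant. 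Combining the four contributions and choosing $A>100K$ yields, for $n$ large, the claimed bound $\frac{2}{n^{1/6}}(\epsilon d/(2n))^{|E_1(H)|}(d/n)^{|E_{\geq 2}(H)|}$.
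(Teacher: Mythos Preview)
Your decomposition of $\mathcal{E}_{wb,H}^c$ and your treatment of $E_1'$, $E_3'$, and $E_4'$ are essentially correct and close in spirit to the paper (which groups conditions~2 and~3 into a single event $\mathcal{E}_{H,g}^c$ and uses the layered partition $\mathcal{E}_{H,b}^c\cup\mathcal{E}_{H,b\overline{g}}\cup\mathcal{E}_{H,bg\overline{d}}$, but the logic is the same). In particular, your identity $\overline{\mathbf{Y}}_H\mathbf{Y}_{uv}=\overline{\mathbf{Y}}_{H+uv}$ for $H$-cross-edges is clean and lets you apply Lemma~\ref{lem:lem_UH} directly to $H+uv$; this is a nice shortcut the paper does not take.

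The gap is in $E_2'$, precisely at the ``truncation-of-$v$ piece'' $\overline{\mathbf{Y}}_H\,\mathbf{Y}_{uv}\,\mathbbm{1}_{\{d_{\mathbf{G}}(v)>\Delta\}}$. You assert that its indicator has conditional probability at most $\eta$ and move on, but this is not enough: the event $\{d_{\mathbf{G}}(v)>\Delta\}$ depends on the edges $wv$ with $w\in V(H)$, and those same edges enter $\overline{\mathbf{Y}}_H$ through the degrees $d_{\mathbf{G}}(w)$ in the truncation indicator $\mathbbm{1}_{\mathcal{E}_H^c}$. So you cannot just pull the probability $\eta$ out and keep the signed expectation $\mathbb{E}[\overline{\mathbf{Y}}_H\mathbf{Y}_{uv}\mid\mathbf{x}]$ intact. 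If instead you take absolute values to break the dependence, you lose the $\epsilon$ factor on every multiplicity-1 edge --- exactly the obstacle you flag in your last paragraph, and there are $\Theta(\log n)$ such edges. Nor can you fall back on bounding $|\mathbb{E}[\overline{\mathbf{Y}}_H\mathbf{Y}_{uv}\mid\mathbf{x}]|$ directly: $\mathbf{Y}_{uv}$ is not conditionally independent of $\overline{\mathbf{Y}}_H$ (it feeds into $d_{\mathbf{G}}(u)$), so no extra $d/n$ factor appears for free, and summing over the $\tilde{O}(n^{3/4})$ candidate edges would blow up.

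The paper resolves this by conditioning not on the offending edge but on the \emph{lucky/unlucky partition} of $V(H)$ (Definition~\ref{def:def_unlucky}): a vertex $w\in V(H)$ is unlucky if some cross-edge or some edge from $w$ to $[n]\setminus(V(H)\cup V_b(H,\mathbf{x}))$ is present. The key point is that this partition is $\sigma(\mathbf{x},\mathbf{G}-E_1(H))$-measurable, so Lemma~\ref{lem:lem_UH_deg1_common} applies with $U$ equal to the set of lucky vertices in $\mathcal{S}(H)$: the signed cancellation on $E_1^b(H)$ survives, and each unlucky vertex costs only a constant factor $(6/\epsilon)^\tau$, which is swallowed by the per-vertex probability bound $n^{-1/5}$ from Lemma~\ref{lem:lem_bound_prob_completely_unlucky}. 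Your ``add the edge to $H$'' device bypasses this machinery for cross-edges but not for the external vertex $v$; to close the gap you would either need to reproduce the lucky/unlucky conditioning, or carry out a further decomposition of $\mathbbm{1}_{\{d_{\mathbf{G}}(v)>\Delta\}}$ according to whether $v$ has neighbours in $V(H)$ (the ``no neighbour'' case does factor, and the ``some neighbour'' case can be handled by another round of your $\mathbbm{1}=\mathbf{Y}+d/n$ trick, but this has to be written out).
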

\begin{proof}
We only provide a proof sketch here. The detailed proof can be found in \Cref{subsubsec:subsubsec_upper_bounding_not_well_behaved}.

We start by further conditioning on more refined events corresponding to the reasons that caused $(\mathbf{G},\mathbf{x})$ to be not-well-behaved. Define the following events:

\begin{equation}
\label{eq:eq_def_E_wb_H_x}
\begin{aligned}
\mathcal{E}_{H,b} &=\big\{\text{Condition (1) of \Cref{def:def_well_behaved_H} is satisfied}\big\}\\
&=\big\{\mathbf{x} \text{ is approximately balanced on }[n]\setminus V(H)\big\},
\end{aligned}
\end{equation}

\begin{equation}
\label{eq:eq_def_E_wb_H_g}
\mathcal{E}_{H,g}=\big\{\text{Conditions (2) and (3) of \Cref{def:def_well_behaved_H} are satisfied}\big\},
\end{equation}
and
\begin{equation}
\label{eq:eq_def_E_wb_H_d}
\begin{aligned}
\mathcal{E}_{H,d}&=\big\{\text{Condition (4) of \Cref{def:def_well_behaved_H} is satisfied}\big\}\\
&=\big\{\text{All edges in $E_{\geq 2}(H)$ are present in $\mathbf{G}$}\big\}.
\end{aligned}
\end{equation}

Clearly,
$$\mathcal{E}_{wb,H}=\mathcal{E}_{H,b}\cap \mathcal{E}_{H,g}\cap \mathcal{E}_{H,d}.$$

We also define the following:

\begin{equation}
\label{eq:eq_def_E_wb_H_xg}
\mathcal{E}_{H,bg}=\mathcal{E}_{H,b}\cap\mathcal{E}_{H,g},
\end{equation}

\begin{equation}
\label{eq:eq_def_E_wb_H_xog}
\mathcal{E}_{H,b\overline{g}}=\mathcal{E}_{H,b}\cap\mathcal{E}_{H,g}^c,
\end{equation}
and
\begin{equation}
\label{eq:eq_def_E_wb_H_xgod}
\mathcal{E}_{H,bg\overline{d}}=\mathcal{E}_{H,bg}\cap\mathcal{E}_{H,d}^c.
\end{equation}

We will decompose $\mathcal{E}_{wb,H}^c$ into mutually exclusive events as follows:
$$\mathcal{E}_{wb,H}^c=\mathcal{E}_{H,b}^c\cup \mathcal{E}_{H,b\overline{g}}\cup \mathcal{E}_{H,bg\overline{d}}.$$
Therefore,
\begin{align*}
\mathbb{E}\big[\overline{\mathbf{Y}}_H\big|\mathcal{E}_{wb,H}^c\big]\cdot\mathbb{P}[\mathcal{E}_{wb,H}^c]=\mathbb{E}\big[\overline{\mathbf{Y}}_H\big|\mathcal{E}_{H,b}^c\big]\cdot\mathbb{P}[\mathcal{E}_{H,b}^c]
&+ \mathbb{E}\big[\overline{\mathbf{Y}}_H\big|\mathcal{E}_{H,b\overline{g}}\big]\cdot\mathbb{P}[\mathcal{E}_{H,b\overline{g}}]\\
&\;\;+ \mathbb{E}\big[\overline{\mathbf{Y}}_H\big|\mathcal{E}_{H,bg\overline{d}}\big]\cdot\mathbb{P}[\mathcal{E}_{H,bg\overline{d}}].
\end{align*}
We will separately upper bound the absolute value of each term in the right hand side of the above equation.

For $\mathbb{E}\big[\overline{\mathbf{Y}}_H\big|\mathcal{E}_{H,b}^c\big]\cdot\mathbb{P}[\mathcal{E}_{H,b}^c]$, we use Hoeffding's inequality to deduce that 
$$\mathbb{P}[\mathcal{E}_{H,b}^c]\leq 2e^{-\frac{9}{8}\sqrt{n}}.$$
By combining this with \Cref{lem:lem_UH}, it is possible to show that for $n$ large enough, we have
\begin{equation}
\label{eq:eq_lem_upper_bound_nice_Wc_y}
\left|\mathbb{E}\big[\overline{\mathbf{Y}}_H\big|\mathcal{E}_{H,b}^c\big]\cdot\mathbb{P}[\mathcal{E}_{H,b}^c]\right|\leq e^{-\sqrt{n}}\cdot\left(\frac{\epsilon d}{2n}\right)^{|E_1(H)|}\cdot\left(\frac{d}{n}\right)^{|E_{\geq 2}(H)|}.
\end{equation}

For $\mathbb{E}\big[\overline{\mathbf{Y}}_H\big|\mathcal{E}_{H,b\overline{g}}\big]\cdot\mathbb{P}[\mathcal{E}_{H,b\overline{g}}]$, we first write
$$\mathbb{E}\big[\overline{\mathbf{Y}}_H\big|\mathcal{E}_{H,b\overline{g}}\big]\cdot\mathbb{P}[\mathcal{E}_{H,b\overline{g}}]=\mathbb{E}\left[\mathbb{E}\big[\overline{\mathbf{Y}}_H\big|\mathbf{x},\mathcal{E}_{H,b\overline{g}}\big]\cdot\mathbb{P}[\mathbf{x}|\mathcal{E}_{H,b\overline{g}}]\right].$$
It is possible to show that $\displaystyle\left|\mathbb{E}\big[\overline{\mathbf{Y}}_H\big|\mathbf{x},\mathcal{E}_{H,b\overline{g}}\big]\right|$ can be upper bounded using the same techniques that allowed us to upper bound $\big|\mathbb{E}\big[\overline{\mathbf{Y}}_H\big|\mathbf{x}\big]\big|$ in \Cref{lem:lem_UH}. On the other hand, notice that there are very few $H$-cross-edges\footnote{There are at most $(st)^2=\tilde{O}(1)$ such edges.}, and very few edges between\footnote{There are at most $st\cdot 2n^{\frac{3}{4}}=\tilde{O}\big(n^{\frac{3}{4}}\big)$ such edges.} $V(H)$ and $\big([n]\setminus V(H)\big)\setminus V_b(H,\mathbf{x})$. Now since each particular edge is present in $\mathbf{G}$ with probability $O\left(\frac{1}{n}\right)$, it can be easily seen that $\mathbb{P}[\mathcal{E}_{H,b\overline{g}}]=\tilde{O}\big(n^{-\frac{1}{4}}\big)$. By carefully combining these facts together, we can show that
\begin{equation}
\label{eq:eq_lem_upper_bound_nice_Wc_g}
\begin{aligned}
\left|\mathbb{E}\big[\overline{\mathbf{Y}}_H\big|\mathcal{E}_{H,b\overline{g}}\big]\cdot\mathbb{P}[\mathcal{E}_{H,b\overline{g}}]\right|&\leq \tilde{O}\left(\frac{n^{\frac{2K}{A}}}{n^{\frac{1}{4}}}\right)\cdot\left(\frac{\epsilon d}{2n}\right)^{|E_1(H)|}\cdot\left(\frac{d}{n}\right)^{|E_{\geq 2}(H)|}\\
&\leq \frac{1}{n^{\frac{1}{6}}}\cdot\left(\frac{\epsilon d}{2n}\right)^{|E_1(H)|}\cdot\left(\frac{d}{n}\right)^{|E_{\geq 2}(H)|},
\end{aligned}
\end{equation}
where the last inequality is true for $n$ large enough.

For $\mathbb{E}\big[\overline{\mathbf{Y}}_H\big|\mathcal{E}_{H,bg\overline{d}}\big]\cdot\mathbb{P}[\mathcal{E}_{H,bg\overline{d}}]$, notice that $\mathcal{E}_{H,bg\overline{d}}$ implies that at least one edge of multiplicity $\geq\hspace*{-1.2mm}2$ is absent from $\mathbf{G}$. On the other hand, \cref{eq:eq_edge_mult_2_present} and \cref{eq:eq_edge_mult_2_absent} imply that the contribution of an edge $uv\in E_{\geq 2}(H)$ in case it is absent from $\mathbf{G}$ is at least $O\left(\frac{1}{n}\right)$ smaller than its contribution when it is present. By the same techniques that allowed us to upper bound $\big|\mathbb{E}\big[\overline{\mathbf{Y}}_H\big|\mathbf{x}\big]\big|$ in \Cref{lem:lem_UH}, together with the fact that we have at most $st=\tilde{O}(1)$ edges in $E_{\geq 2}(H)$, and the fact that at least one of these edges is absent from $\mathbf{G}$, it is possible to show that
\begin{equation}
\label{eq:eq_lem_upper_bound_nice_Wc_d}
\begin{aligned}
\left|\mathbb{E}\big[\overline{\mathbf{Y}}_H\big|\mathcal{E}_{H,bg\overline{d}}\big]\cdot\mathbb{P}[\mathcal{E}_{H,bg\overline{d}}]\right|&\leq \tilde{O}\left(\frac{n^{\frac{2K}{A}}}{n}\right)\cdot\left(\frac{\epsilon d}{2n}\right)^{|E_1(H)|}\cdot\left(\frac{d}{n}\right)^{|E_{\geq 2}(H)|}\\
&\leq \frac{1}{n^{\frac{1}{2}}}\cdot\left(\frac{\epsilon d}{2n}\right)^{|E_1(H)|}\cdot\left(\frac{d}{n}\right)^{|E_{\geq 2}(H)|},
\end{aligned}
\end{equation}
where the last inequality is true for $n$ large enough.

By combining \cref{eq:eq_lem_upper_bound_nice_Wc_y}, \cref{eq:eq_lem_upper_bound_nice_Wc_g} and \cref{eq:eq_lem_upper_bound_nice_Wc_d}, we get the lemma. See \Cref{subsubsec:subsubsec_upper_bounding_not_well_behaved} for the details.
\end{proof}

\subsubsection{Upper bound on the negligible part of the contribution of the well-behaved event}

Now in order to study $\mathbb{E}\big[\overline{\mathbf{Y}}_H\big|\mathcal{E}_{wb,H}\big]\cdot\mathbb{P}[\mathcal{E}_{wb,H}]$, we will divide this expression into the sum of two terms: one that is negligible, and one that is significant. In this section, we prove an upper bound on the negligible part. In the next section, we will prove tight bounds on the significant part.

If we denote the set of $H$-cross-edges as $E_c(H)$, then we have\footnote{Recall that $V_b(H,\mathbf{x})$ is always defined, i.e., it is defined even when $\mathbf{x}$ is not approximately balanced on $[n]\setminus V(H)$.}
\begin{equation*}
%\label{eq:eq_Prob_Well_Behaved_Y_firsteq}
\begin{aligned}
\mathbb{P}[\mathcal{E}_{wb,H}|\mathbf{x}]&=\resizebox{0.8\textwidth}{!}{$\displaystyle\left[\prod_{e\in E_c(H)}\mathbb{P}[e\notin\mathbf{G}|\mathbf{x}]\right]\cdot\left[\prod_{\substack{v\in V(H),\\u\in \big([n]\setminus V(H)\big)\setminus V_b(H,\mathbf{x})}}\mathbb{P}[uv\notin\mathbf{G}|\mathbf{x}]\right]\cdot\left[\prod_{e\in E_{\geq 2}(H)}\mathbb{P}[e\in\mathbf{G}|\mathbf{x}]\right]\cdot\mathbbm{1}_{\mathcal{E}_{H,b}}(\mathbf{x})$}\\
&=\left[\prod_{e\in E_{\overline{b}c}(H,\mathbf{x})}\mathbb{P}[e\notin\mathbf{G}|\mathbf{x}]\right]\cdot\left[\prod_{e\in E_{\geq 2}(H)}\mathbb{P}[e\in\mathbf{G}|\mathbf{x}]\right]\cdot\mathbbm{1}_{\mathcal{E}_{H,b}}(\mathbf{x}),
\end{aligned}
\end{equation*}
where
\begin{equation}
\label{eq:eq_def_E_obc_H_H}
E_{\overline{b}c}(H,\mathbf{x})=E_c(H)\cup\big\{uv:\; u\in \big([n]\setminus V(H)\big)\setminus V_b(H,\mathbf{x}),v\in V(H)\big\}.
\end{equation}

Note that we could write $\mathbbm{1}_{\mathcal{E}_{H,b}}=\mathbbm{1}_{\mathcal{E}_{H,b}}(\mathbf{x})$ as a function of $\mathbf{x}$ because the event $\mathcal{E}_{H,b}$ depends only on $\mathbf{x}$, i.e., it is $\sigma(\mathbf{x})$-measurable. Hence,

\begin{align*}
\mathbb{P}[\mathcal{E}_{wb,H}|\mathbf{x}]&=\left[\prod_{uv\in E_{\overline{b}c}(H,\mathbf{x})}\left[1-\left(1+\frac{\epsilon \mathbf{x}_u\mathbf{x}_v}{2}\right)\frac{d}{n}\right]\right]\cdot\left[\prod_{uv\in E_{\geq 2}(H)}\left[\left(1+\frac{\epsilon \mathbf{x}_u\mathbf{x}_v}{2}\right)\frac{d}{n}\right]\right]\cdot\mathbbm{1}_{\mathcal{E}_{H,b}}(\mathbf{x})\\
&=\left[\sum_{S\subseteq E_{\overline{b}c}(H,\mathbf{x})}\prod_{uv\in S}\left[-\left(1+\frac{\epsilon \mathbf{x}_u\mathbf{x}_v}{2}\right)\frac{d}{n}\right]\right]\cdot\left[\prod_{uv\in E_{\geq 2}(H)}\left[\left(1+\frac{\epsilon \mathbf{x}_u\mathbf{x}_v}{2}\right)\frac{d}{n}\right]\right]\cdot\mathbbm{1}_{\mathcal{E}_{H,b}}(\mathbf{x})\\
&=P_{1,H}^{wb}(\mathbf{x})+P_{2,H}^{wb}(\mathbf{x}),
\end{align*}
where
\begin{equation}
\label{eq:eq_Prob_Well_Behaved_Y_1}
P_{1,H}^{wb}(\mathbf{x})=\left[\prod_{uv\in E_{\geq 2}(H)}\left[\left(1+\frac{\epsilon \mathbf{x}_u\mathbf{x}_v}{2}\right)\frac{d}{n}\right]\right]\cdot\mathbbm{1}_{\mathcal{E}_{H,b}}(\mathbf{x}),
\end{equation}
and
\begin{equation}
\label{eq:eq_Prob_Well_Behaved_Y_2}
P_{2,H}^{wb}(\mathbf{x})=\left[\sum_{\substack{S\subseteq E_{\overline{b}c}(H,\mathbf{x}):\\S\neq\varnothing}}\prod_{uv\in S}\left[-\left(1+\frac{\epsilon \mathbf{x}_u\mathbf{x}_v}{2}\right)\frac{d}{n}\right]\right]\cdot\left[\prod_{uv\in E_{\geq 2}(H)}\left[\left(1+\frac{\epsilon \mathbf{x}_u\mathbf{x}_v}{2}\right)\frac{d}{n}\right]\right]\cdot\mathbbm{1}_{\mathcal{E}_{H,b}}(\mathbf{x}).
\end{equation}

It is easy to see that $P_{1,H}^{wb}(\mathbf{x})$ contains the monomials in $\mathbb{P}[\mathcal{E}_{wb,H}|\mathbf{x}]$ that do not depend on $(\mathbf{x}_u\mathbf{x}_v)_{uv\in E_{\overline{b}c}(H,\mathbf{x})}$, whereas $P_{2,H}^{wb}(\mathbf{x})$ contains the monomials in $\mathbb{P}[\mathcal{E}_{wb,H}|\mathbf{x}]$ that depend on $(\mathbf{x}_u\mathbf{x}_v)_{uv\in E_{\overline{b}c}(H,\mathbf{x})}$. We will decompose $\mathbb{E}\big[\overline{\mathbf{Y}}_H\big|\mathcal{E}_{wb,H}\big]\cdot\mathbb{P}[\mathcal{E}_{wb,H}]$ as follows:
\begin{equation}
\label{eq:eq_decompose_X_H_W_H}
\begin{aligned}
\mathbb{E}\big[\overline{\mathbf{Y}}_H\big|\mathcal{E}_{wb,H}\big]\cdot\mathbb{P}[\mathcal{E}_{wb,H}]&= \mathbb{E}\left[\mathbb{E}\big[\overline{\mathbf{Y}}_H\big|\mathbf{x},\mathcal{E}_{wb,H}\big]\cdot\mathbb{P}[\mathcal{E}_{wb,H}|\mathbf{x}]\right]\\
&=\resizebox{0.6\textwidth}{!}{$\displaystyle \mathbb{E}\left[\mathbb{E}\big[\overline{\mathbf{Y}}_H\big|\mathbf{x},\mathcal{E}_{wb,H}\big]\cdot P_{1,H}^{wb}(\mathbf{x})\right]+ \mathbb{E}\left[\mathbb{E}\big[\overline{\mathbf{Y}}_H\big|\mathbf{x},\mathcal{E}_{wb,H}\big]\cdot P_{2,H}^{wb}(\mathbf{x})\right]$}.
\end{aligned}
\end{equation}

The following lemma provides an upper bound on $\left|\mathbb{E}\left[\mathbb{E}\big[\overline{\mathbf{Y}}_H\big|\mathbf{x},\mathcal{E}_{wb,H}\big]\cdot P_{2,H}^{wb}(\mathbf{x})\right]\right|$.

\begin{lemma}
\label{lem:lem_upper_bound_nice_W_2}
Let $H$ be a multigraph with at most $st=sK\log n$ vertices and at most $st$ multi-edges. Assume that $\mathcal{L}_1(H)=\mathcal{L}_{\geq 2}(H)=\varnothing$, and that $E_{\geq 2}(H)$ forms a forest. If $A>\max\{100K,1\}$ and $n$ is large enough, then we have
$$\mathbb{E}\left[\left|\mathbb{E}\big[\overline{\mathbf{Y}}_H\big|\mathbf{x},\mathcal{E}_{wb,H}\big]\cdot P_{2,H}^{wb}(\mathbf{x})\right|\right]\leq \frac{1}{n^{\frac{1}{6}}}\cdot\left(\frac{\epsilon d}{2n}\right)^{|E_1(H)|}\cdot\left(\frac{d}{n}\right)^{|E_{\geq 2}(H)|},$$
where $P_{2,H}^{wb}(\mathbf{x})$ is defined in \cref{eq:eq_Prob_Well_Behaved_Y_2}.
\end{lemma}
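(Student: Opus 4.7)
The plan is to bound $\E\big[\big|\E[\overline{\mathbf{Y}}_H\mid\mathbf{x},\mathcal{E}_{wb,H}]\cdot P_{2,H}^{wb}(\mathbf{x})\big|\big]$ by combining a pointwise estimate on the conditional expectation with a sparsity estimate on $P_{2,H}^{wb}(\mathbf{x})$, followed by an averaging step that exploits the forest hypothesis on $E_{\geq 2}(H)$ to cancel what would otherwise be an intractable exponential factor.

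First, I would establish the pointwise bound
\[
\Bigabs{\E\Brac{\overline{\mathbf{Y}}_H \given \mathbf{x},\mathcal{E}_{wb,H}}} \le n^{2K/A}\cdot\Paren{\frac{\epsilon d}{2n}}^{|E_1(H)|}
\]
by adapting the argument of \cref{lem:lem_UH_deg1}. Under $\mathcal{E}_{wb,H}$ every edge of $E_{\geq 2}(H)$ is deterministically present in $\mathbf{G}$, so it contributes only a factor $(1-d/n)^{m_H(uv)}\le 1$ rather than the $d/n$ that would arise from also averaging over its presence; that $(d/n)^{|E_{\geq 2}(H)|}$ factor is instead absorbed entirely into $P_{1,H}^{wb}+P_{2,H}^{wb}$. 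The remaining analysis of $E_1(H)$ edges proceeds as in \cref{lem:lem_UH_deg1}, with the hypothesis $\mathcal{L}_1(H)=\mathcal{L}_{\geq 2}(H)=\varnothing$ killing the annoying-edge and $(\geq\!2)$-large-vertex terms. Crucially, this bound does not depend on $\mathbf{x}$, which is what makes the averaging step below work.

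Second, I would bound $|P_{2,H}^{wb}(\mathbf{x})|$ by factoring out the $E_{\geq 2}(H)$ product and estimating the sum over nonempty $S\subseteq E_{\overline b c}(H,\mathbf{x})$ via $(1+\tfrac{3d}{2n})^{|E_{\overline b c}(H,\mathbf{x})|}-1$. Using $|E_{\overline b c}(H,\mathbf{x})|\le (st)^2 + 2st\cdot n^{3/4} = O(st\cdot n^{3/4})$ and $(1+\tfrac{3d}{2n})^{O(st\cdot n^{3/4})}-1 = O(st/n^{1/4})$, this yields
\[
\abs{P_{2,H}^{wb}(\mathbf{x})}\le \Paren{\frac{d}{n}}^{|E_{\geq 2}(H)|}\cdot\prod_{uv\in E_{\geq 2}(H)}\Paren{1+\tfrac{\epsilon \mathbf{x}_u\mathbf{x}_v}{2}}\cdot O\Paren{\frac{st}{n^{1/4}}},
\]
where I have kept the $\mathbf{x}$-dependent factor $\prod_{uv\in E_{\geq 2}(H)}(1+\epsilon \mathbf{x}_u\mathbf{x}_v/2)$ explicit rather than bounding it pointwise by $(3/2)^{|E_{\geq 2}(H)|}$.

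Third, I combine the two estimates and take expectation over $\mathbf{x}$. Since the Step 1 bound is uniform in $\mathbf{x}$, only the random factor $\prod_{uv\in E_{\geq 2}(H)}(1+\epsilon\mathbf{x}_u\mathbf{x}_v/2)$ remains inside the expectation. Expanding it gives $\sum_{T\subseteq E_{\geq 2}(H)}(\epsilon/2)^{|T|}\prod_{uv\in T}\mathbf{x}_u\mathbf{x}_v$, and by \cref{lem:lem_Expectation_Y_Nice} the $\mathbf{x}$-expectation of each monomial vanishes unless $T$ is an edge-disjoint union of cycles. Since $E_{\geq 2}(H)$ is a forest by hypothesis, the only surviving term is $T=\varnothing$, so
\[
\E\Brac{\prod_{uv\in E_{\geq 2}(H)}\Paren{1+\tfrac{\epsilon\mathbf{x}_u\mathbf{x}_v}{2}}}=1.
\]
Assembling everything gives an upper bound of $O(st)\cdot n^{2K/A-1/4}\cdot(\epsilon d/(2n))^{|E_1(H)|}(d/n)^{|E_{\geq 2}(H)|}$, which is at most the claimed $n^{-1/6}$ bound once $A>100K$ and $n$ is large enough.

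The main obstacle is the factor $\prod_{uv\in E_{\geq 2}(H)}(1+\epsilon\mathbf{x}_u\mathbf{x}_v/2)$: bounding it pointwise by $(3/2)^{|E_{\geq 2}(H)|}$ contributes a multiplicative term that is polynomial in $n$, since $|E_{\geq 2}(H)|$ can be as large as $st/2=\Theta(\log n)$, and this would swamp the $n^{-1/4}$ gain from sparsity. The forest hypothesis on $E_{\geq 2}(H)$ is precisely what allows this factor to be averaged out via \cref{lem:lem_Expectation_Y_Nice}, rather than bounded pointwise, and is the crucial structural ingredient provided by the assumptions of the lemma.
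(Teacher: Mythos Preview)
Your proposal is correct and follows essentially the same route as the paper. The paper likewise bounds $\big|\E[\overline{\mathbf{Y}}_H\mid\mathbf{x},\mathcal{E}_{wb,H}]\big|$ by $n^{2K/A}\cdot(\epsilon d/(2n))^{|E_1(H)|}$ (invoking \cref{lem:lem_UH_deg1_common} with $\mathcal{E}=\mathcal{E}_{wb,H}$, $U=\mathcal{S}(H)$, and then $|\tilde{\mathbf Y}_{\ge 2}^H|\le 1$), bounds $|P_{2,H}^{wb}(\mathbf{x})|$ via the same sparsity count on $E_{\overline bc}(H,\mathbf{x})$ while retaining the $\prod_{uv\in E_{\ge 2}(H)}(1+\tfrac{\epsilon}{2}\mathbf{x}_u\mathbf{x}_v)$ factor, and then averages over $\mathbf{x}$ using \cref{lem:lem_Expectation_Y_Nice} together with the forest hypothesis to reduce that product to~$1$.
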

\begin{proof}
We only provide a proof sketch here. The detailed proof can be found in \Cref{subsubsec:subsubsec_upper_bounding_well_behaved_negligible}.

Recall that $P_{2,H}^{wb}(\mathbf{x})$ contains only the monomials in $\mathbb{P}[\mathcal{E}_{wb,H}|\mathbf{x}]$ that depend on $(\mathbf{x}_u\mathbf{x}_v)_{uv\in E_{\overline{b}c}(H,\mathbf{x})}$, where $E_{\overline{b}c}(H,\mathbf{x})$ is as in \cref{eq:eq_def_E_obc_H_H}. On the other hand, for every $uv\in E_{\overline{b}c}(H,\mathbf{x})$, the term in $\mathbb{P}[uv\notin\mathbf{G}|\mathbf{x}]$ that depends on $\mathbf{x}_u\mathbf{x}_v$ is equal to $O\left(\frac{1}{n}\right)$, whereas the term that does not depend on $\mathbf{x}_u\mathbf{x}_v$ is equal to $1-O\left(\frac1n\right)$. Now since there are at most $s^2t^2+2st\cdot n^{\frac{3}{4}}=\tilde{O}\left(n^{\frac{3}{4}}\right)$ edges in $E_{\overline{b}c}(H,\mathbf{x})$, after carrying out a few careful calculations, it is possible to show that
$$|P_{2,H}^{wb}(\mathbf{x})|\leq \tilde{O}\left(\frac{1}{n^{\frac{1}{4}}}\right)\cdot \prod_{uv\in E_{\geq 2}(H)}\left[\left(1+\frac{\epsilon \mathbf{x}_u\mathbf{x}_v}{2}\right)\frac{d}{n}\right].$$

By using the same techniques that allowed us to prove the upper bound in \Cref{lem:lem_UH}, we can now show that
\begin{align*}
\mathbb{E}\left[\left|\mathbb{E}\big[\overline{\mathbf{Y}}_H\big|\mathbf{x},\mathcal{E}_{wb,H}\big]\cdot P_{2,H}^{wb}(\mathbf{x})\right|\right]
&\leq \tilde{O}\left(\frac{n^{\frac{K}{A}}}{n^{\frac{1}{4}}}\right)\cdot\left(\frac{\epsilon d}{2n}\right)^{|E_1(H)|}\cdot\left(\frac{d}{n}\right)^{|E_{\geq 2}(H)|}\\
&\leq \frac{1}{n^{\frac{1}{6}}}\cdot\left(\frac{\epsilon d}{2n}\right)^{|E_1(H)|}\cdot\left(\frac{d}{n}\right)^{|E_{\geq 2}(H)|}.
\end{align*}

See \Cref{subsubsec:subsubsec_upper_bounding_well_behaved_negligible} for the details.
\end{proof}

\subsubsection{Tight bounds on the significant part of the contribution of the well-behaved event}

The following lemma provides tight bounds for $\mathbb{E}\left[\mathbb{E}\big[\overline{\mathbf{Y}}_H\big|\mathbf{x},\mathcal{E}_{wb,H}\big]\cdot P_{1,H}^{wb}(\mathbf{x})\right]$.

\begin{definition}
\label{def:def_E_geq2_triple_prime}
Let $H$ be an $(s,t)$-pleasant multigraph and let $H^{(1)},\ldots,H^{(r_H)}$ be the agreeable components of $H$. For every $i\in[r_H]$, we define $E_{\geq 2}'''(H^{(i)})$ as follows:
\begin{itemize}
\item If $H^{(i)}$ is an agreeable component of type 1 or type 2, we define $E_{\geq 2}'''(H^{(i)})=E_{\geq 2}\big(H^{(i)}\big)$.
\item If $H^{(i)}$ is an agreeable component of type 3, let $E'\big(H^{(i)}\big)$ and $E''\big(H^{(i)}\big)$ be as in \Cref{def:def_agreeable_multigraphs}, i.e., $E'\big(H^{(i)}\big)$ and $E''\big(H^{(i)}\big)$ are cycles and $E'\big(H^{(i)}\big)\cap E''\big(H^{(i)}\big)$ is a simple path of edges of multiplicity at least 2. We define $$E_{\geq 2}'''\big(H^{(i)}\big)=E_{\geq 2}\big(H^{(i)}\big)\setminus\Big(E'\big(H^{(i)}\big)\cap E''\big(H^{(i)}\big)\Big).$$
\end{itemize}
We also define $\displaystyle E_{\geq 2}'''(H)=\bigcup_{i\in[r_H]}E_{\geq 2}'''\big(H^{(i)}\big)$.
\end{definition}

\begin{lemma}
\label{lem:lem_main_lemma_pleasant_wb}
Let $H$ be an $(s,t)$-pleasant multigraph, where $t=K\log n$. If $A>\max\left\{1,100K,\frac{100}{K}\right\}$ and $n$ is large enough, then
\begin{align*}
\left(P_s^H -\frac{2}{\sqrt{n}}\right)\cdot&\left(\frac{\epsilon d}{2n}\right)^{|E_1(H)|+|E_{\geq 2}'''(H)|}\left(\frac{d}{n}\right)^{|E_{\geq 2}(H)|-|E_{\geq 2}'''(H)|}\\
&\leq \mathbb{E}\Big[\mathbb{E}\big[\overline{\mathbf{Y}}_H\big|\mathbf{x},\mathcal{E}_{wb,H}\big]\cdot P_{1,H}^{wb}(\mathbf{x})\Big]\\
&\leq \left(P_s^H+\frac{2}{\sqrt{n}}\right)\cdot\left(\frac{\epsilon d}{2n}\right)^{|E_1(H)|+|E_{\geq 2}'''(H)|}\left(\frac{d}{n}\right)^{|E_{\geq 2}(H)|-|E_{\geq 2}'''(H)|},
\end{align*}
where
$$P_s^H=\mathbb{P}\left[\big\{\forall v\in V(H),\;\mathbf{D}^H_v+ d_1^H(v)+d_{\geq 2}^H(v)\leq \Delta\big\}\middle| \mathcal{E}_{H,b}\right],$$
and $\mathbf{D}^H_v=|\{u\in V_b(H,\mathbf{x}):\; uv\in \mathbf{G}\}|$ is the number of edges from $v$ to $V_b(H,\mathbf{x})$ which are present in $\mathbf{G}$.
\end{lemma}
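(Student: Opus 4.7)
The plan is to carry out a careful conditional expectation computation that exploits the exact balance of $V_b(H,\mathbf{x})$ and the pleasant structure of $H$ to reduce the calculation to \Cref{lem:lem_Expectation_Y_Nice}. First, I would condition on $(\mathbf{D}_v^H)_{v\in V(H)}$ and on $\mathcal{E}_{wb,H}$. Under $\mathcal{E}_{wb,H}$ the only remaining randomness in $\mathbf{G}$ is over the edges in $E_1(H)$ and over the edges from $V(H)$ to $V_b(H,\mathbf{x})$, and these are conditionally mutually independent given $\mathbf{x}$. Because $V_b(H,\mathbf{x})$ is exactly balanced whenever $\mathcal{E}_{H,b}$ holds, the conditional law of $(\mathbf{D}_v^H)_{v\in V(H)}$ given $\mathbf{x}$ is $\mathbf{x}$-free (it factorizes as a product of binomials with parameters that depend only on $d,\epsilon$), so in particular the conditional probability of the truncation event is $\mathbf{x}$-free.

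Second, I would perform an over-conditioning on the truncation event: replace the actual indicator $\prod_{v}\mathbbm{1}\{d_{\mathbf{G}}(v)\leq\Delta\}$ by the strictly stronger $\prod_{v}\mathbbm{1}\{\mathbf{D}_v^H+d_1^H(v)+d_{\geq 2}^H(v)\leq\Delta\}$, which evaluates to the correct indicator whenever the stronger event holds regardless of the $E_1(H)$-edges. The discrepancy contributes an $O(n^{-1/2})$ error on the edge scale (it requires some $v$ to have $\mathbf{D}_v^H$ within $d_1^H(v)$ of $\Delta$ together with an absent $E_1$-edge at $v$). With the stronger indicator, $\overline{\mathbf{Y}}_H$ decouples on $\mathcal{E}_{wb,H}$ as a product of independent $E_1$-edge factors, deterministic $E_{\geq 2}$-edge factors $(1-d/n)^{m_H(uv)}$, and the truncation indicator in $(\mathbf{D}_v^H)_v$. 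Combining this with $P_{1,H}^{wb}(\mathbf{x})$ yields
\[
\mathbb{E}\big[\overline{\mathbf{Y}}_H\big|\mathbf{x},\mathcal{E}_{wb,H}\big]\cdot P_{1,H}^{wb}(\mathbf{x}) = \bigg(1\pm O\big(n^{-1/2}\big)\bigg)\cdot P_s^H\cdot \prod_{uv\in E_1(H)}\frac{\epsilon d\,\mathbf{x}_u\mathbf{x}_v}{2n}\cdot \prod_{uv\in E_{\geq 2}(H)}\!\left(\frac{d}{n}+\frac{\epsilon d\,\mathbf{x}_u\mathbf{x}_v}{2n}\right).
\]

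Third, I would take the $\mathbf{x}$-expectation. Expanding the product over $E_{\geq 2}(H)$ yields a sum over subsets $S\subseteq E_{\geq 2}(H)$; the term indexed by $S$ carries a factor $\prod_{uv\in E_1(H)\cup S}\mathbf{x}_u\mathbf{x}_v$, and by \Cref{lem:lem_Expectation_Y_Nice} its expectation vanishes unless $E_1(H)\cup S$ is an edge-disjoint union of cycles. The pleasant structure of $H$ now forces $S$ uniquely: every cycle in $H$ lies inside some agreeable component $H^{(i)}$, so $E_{\geq 2}$-edges outside $H^{(\ast)}$ can never appear in $S$ (forcing them into their $d/n$ term), and within each agreeable component the parity condition at each vertex determines $S\cap E(H^{(i)})$ uniquely. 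For type-$1$ and type-$2$ components the unique choice is $S\cap E(H^{(i)}) = E_{\geq 2}(H^{(i)})$, and for type-$3$ components one must exclude the shared path $P$ (including $P$ produces odd degree at each endpoint of $P$, since the endpoint is incident to one edge of $P$ and one edge of each of $C_1\setminus P$ and $C_2\setminus P$). Thus the unique surviving $S$ is $E_{\geq 2}'''(H)$, and it produces the asserted main term $P_s^H\cdot(\epsilon d/(2n))^{|E_1(H)|+|E_{\geq 2}'''(H)|}(d/n)^{|E_{\geq 2}(H)|-|E_{\geq 2}'''(H)|}$.

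The main obstacle is the cycle classification in step three: I expect the cleanest route is a case analysis by agreeable-component type combined with a direct parity argument at every vertex of $V(H)$, crucially invoking the pleasantness hypothesis that all cycles of $H$ lie in $H^{(\ast)}$. Lower-order issues are the over-conditioning correction (handled by a union-bound over the at most $st$ vertices, using $\mathbf{D}_v^H = d(1\pm o(1))$ concentration when $\mathbf{D}_v^H$ is close to $\Delta$), the $(1-d/n)^{m_H(uv)}=1-O(1/n)$ factors, and the loss from $\mathbbm{1}_{\mathcal{E}_{H,b}}$, which by Hoeffding costs only $e^{-\Omega(\sqrt n)}$; aggregated these contribute at most $2/\sqrt n$ on the edge-factor scale, matching the stated error.
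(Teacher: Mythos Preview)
Your cycle-classification argument in step three is correct and in fact cleaner than the paper's: once you over-condition to strong safety, the only $\mathbf{x}$-dependent factors are $\prod_{uv\in E_1(H)}\mathbf{x}_u\mathbf{x}_v\cdot\prod_{uv\in S}\mathbf{x}_u\mathbf{x}_v$ for $S\subseteq E_{\geq 2}(H)$, and the pleasant structure indeed forces $S=E_{\geq2}'''(H)$ uniquely.

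The gap is in step two. Your displayed pointwise-in-$\mathbf{x}$ approximation is false. A pleasant multigraph may contain vertices $v\in\mathcal{I}_{\geq 2}(H)$ with $d_{\geq 2}^H(v)$ close to $\Delta$ (indeed $d_1^H(v)+d_{\geq 2}^H(v)>\Delta$ is allowed), and for such $v$ the strong-safety indicator $\mathbbm{1}\{\mathbf{D}_v^H\le \Delta-d_1^H(v)-d_{\geq 2}^H(v)\}$ fails with constant probability (or always, in which case $P_s^H=0$). Your proposed union bound over vertices cannot give $O(n^{-1/2})$ here: the event ``$\mathbf{D}_v^H$ in the critical window'' is not in the Poisson tail for these vertices but near zero, with constant mass. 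So the discrepancy between the true and over-conditioned expectation is not $O(n^{-1/2})$ times the main term pointwise; it can be comparable to it.

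The paper recovers the $O(n^{-1/2})$ bound only \emph{after} taking the $\mathbf{x}$-expectation of the discrepancy, through two mechanisms you do not invoke. First, a cancellation (\Cref{lem:lem_sum_structurelly_safe_edge_safe_exists}): conditionally on $\mathbf{D}^H=\mathsf{d}$, if some edge in each cycle remains $\mathsf{d}$-safe, the alternating-sign sum over structurally-safe edge configurations telescopes to something $\le n^{-1/2}$ of the main term. Second, when a whole cycle has no $\mathsf{d}$-safe edge, the contribution can blow up by $(16/\epsilon)^{st}$, and the paper absorbs this via the exponentially small probability of that event (\Cref{lem:lem_pleasant_prob_S_is_empty}), which is exactly where the hypothesis that every cycle has $\ge t/A$ multiplicity-1 edges is used. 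Your over-conditioning shortcut bypasses the configuration sum and therefore loses access to both mechanisms; to close the gap you would need to re-expand the discrepancy term and redo essentially this case analysis.
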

\begin{proof}
Roughly speaking, since $\mathbb{E}\big[\overline{\mathbf{Y}}_H\big|\mathbf{x},\mathcal{E}_{wb,H}\big]\cdot P_{1,H}^{wb}(\mathbf{x})$ contains monomials that depend only on $(\mathbf{x}_u\mathbf{x}_v)_{uv\in E(H)}$, \Cref{lem:lem_Expectation_Y_Nice} implies that by taking the expectation, we can get rid of most of the terms, and obtain a simple expression that can be easily bounded. The detailed proof can be found in \Cref{subsubsec:subsubsec_bounding_well_behaved_significant}.
\end{proof}

%\begin{lemma}
%\label{lem:lem_lower_bound_nice_significant}
%Assume that $A>\max\{100K,1\}$, let $H\in \nbsaw{s}{t}$.
%\begin{itemize}
%\item If $d_{\geq 2}(v)\leq \Delta -2$ for every $v\in V(E_1(H))$, then for $n$ is large enough, we have
%$$\mathbb{E}\left[\mathbb{E}\big[\overline{\mathbf{Y}}_H\big|\mathbf{x},\mathcal{E}_{wb,H}\big]\cdot P_{1,H}^{wb}(\mathbf{x})\right]\geq \frac{1}{2n^{\frac{2K}{A}}}\cdot \left(\frac{\epsilon d}{2n}\right)^{|E_1(H)|}\cdot\left(\frac{d}{n}\right)^{|E_{\geq 2}(H)|},$$
%where $P_{1,H}^{wb}(\mathbf{x})$ is defined in \cref{eq:eq_Prob_Well_Behaved_Y_1}.
%\item If there exists $v\in V(E_1(H))$ such that $d_{\geq 2}(v)\geq \Delta -1$, then for $n$ is large enough, we have
%$$\left|\mathbb{E}\left[\mathbb{E}\big[\overline{\mathbf{Y}}_H\big|\mathbf{x},\mathcal{E}_{wb,H}\big]\cdot P_{1,H}^{wb}(\mathbf{x})\right]\right|\leq \frac{1}{n^{\frac{1}{2}}}\cdot \left(\frac{\epsilon d}{2n}\right)^{|E_1(H)|}\cdot\left(\frac{d}{n}\right)^{|E_{\geq 2}(H)|}.$$
%\end{itemize}
%\end{lemma}
%\begin{proof}
%Roughly speaking, since $\mathbb{E}\big[\overline{\mathbf{Y}}_H\big|\mathbf{x},\mathcal{E}_{wb,H}\big]\cdot P_{1,H}^{wb}(\mathbf{x})$ contains monomials that depend only on $(\mathbf{x}_u\mathbf{x}_v)_{uv\in E(H)}$, \Cref{lem:lem_Expectation_Y_Nice} implies that by taking the expectation, we can get rid of most of the terms, and obtain a simple expression that can be easily bounded. The detailed proof can be found in \Cref{subsubsec:subsubsec_bounding_well_behaved_significant}.
%\end{proof}

\subsubsection{Tight bounds for pleasant multigraphs}

The following two lemmas study the probability of safety $P_s^H$:

\begin{lemma}
\label{lem:lem_Prob_safety_pleasant_wb}
There exists a non-decreasing function $P_s:\mathbb{Z}\to [0,1]$ such that:
\begin{itemize}
\item $P_s(\ell)=0$ for every $\ell<0$.
\item $P_s(\ell)\geq e^{-4d}$ for every $\ell\geq0$.
\item $P_s(\ell)\geq 1-\frac{\eta}{2}$ for every $\ell\geq\frac{\Delta}{4}$, where $\eta$ is as in \cref{lem:lem_bound_prob_large_outside_deg}.
\item If $n$ is large enough, then for every $(s,t)$-pleasant multigraph $H$ with $t=K\log n$, we have
$$P_s^H=\prod_{v\in V(H)}P_s\big(\Delta-d_1^H(v)-d_{\geq 2}^H(v)\big),$$
where $P_s^H$ is as in \Cref{lem:lem_main_lemma_pleasant_wb}.
\end{itemize}
\end{lemma}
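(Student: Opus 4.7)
The plan is to construct $P_s$ explicitly as the cumulative distribution function of $\mathbf{D}_v^H$ and to prove the factorization by exploiting conditional independence among $(\mathbf{D}_v^H)_{v\in V(H)}$ together with a symmetry that makes this distribution independent of $v$, $\mathbf{x}$ and $H$. The crucial observation is that once $\mathcal{E}_{H,b}$ is assumed, the set $V_b(H,\mathbf{x})$ consists of exactly $\lceil n/2 - n^{3/4}\rceil$ vertices from each community. Consequently, given $\mathbf{x}$ with $\mathcal{E}_{H,b}$ and any $v\in V(H)$, the quantity $\mathbf{D}_v^H$ is a sum of $2\lceil n/2 - n^{3/4}\rceil$ independent Bernoullis, half with success probability $(1+\epsilon/2)d/n$ and half with $(1-\epsilon/2)d/n$. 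Flipping $\mathbf{x}_v$ merely swaps which half is which, so the distribution depends neither on $\mathbf{x}_v$, nor on the identity of $v$, nor on $H$. This justifies defining $P_s(\ell)$ as this common value $\mathbb{P}[\mathbf{D}_v^H\le\ell\mid \mathbf{x},\mathcal{E}_{H,b}]$, which is a function of $\ell$ alone.

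For the factorization, given $\mathbf{x}$ (with $\mathcal{E}_{H,b}$), the random variables $(\mathbf{D}_v^H)_{v\in V(H)}$ are mutually independent, because for distinct $v,v'\in V(H)$ they are determined by disjoint sets of Bernoulli edges: the edge sets $\{uv:u\in V_b(H,\mathbf{x})\}$ and $\{uv':u\in V_b(H,\mathbf{x})\}$ share no edge since $V_b(H,\mathbf{x})\subseteq [n]\setminus V(H)$. Hence
\[
\mathbb{P}\!\left[\forall v,\;\mathbf{D}_v^H+d_1^H(v)+d_{\ge 2}^H(v)\le\Delta\,\middle|\,\mathbf{x},\mathcal{E}_{H,b}\right]=\prod_{v\in V(H)}P_s\bigl(\Delta-d_1^H(v)-d_{\ge 2}^H(v)\bigr).
\]
Since the right-hand side does not depend on $\mathbf{x}$, taking expectation over $\mathbf{x}\mid\mathcal{E}_{H,b}$ preserves the equality, yielding $P_s^H=\prod_{v\in V(H)} P_s(\Delta-d_1^H(v)-d_{\ge 2}^H(v))$ as claimed.

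The three numerical properties of $P_s$ are then verified directly. Monotonicity of $P_s$ and $P_s(\ell)=0$ for $\ell<0$ are immediate, since $P_s$ is the CDF of a nonnegative integer random variable. For the bound $P_s(\ell)\ge e^{-4d}$ when $\ell\ge 0$, I lower bound by $P_s(0)=\prod_{u\in V_b(H,\mathbf{x})}(1-\mathbb{P}[uv\in\mathbf{G}\mid\mathbf{x}])$; taking logarithms and using $\log(1-x)\ge -x-x^2$ for $x\le 1/2$ shows $\log P_s(0)\ge -d(1+o(1))$, so $P_s(0)\ge e^{-4d}$ for $n$ large enough. For $P_s(\ell)\ge 1-\eta/2$ when $\ell\ge\Delta/4$, I apply \Cref{lem:lem_bound_prob_large_outside_deg} to the trivial single-vertex multigraph $H^{\flat}=(\{v\},\emptyset)$, for which $v\in\mathcal{S}_1(H^{\flat})\cap\mathcal{S}_{\ge 2}(H^{\flat})$ vacuously; this gives $\mathbb{P}[d^o_{\mathbf{G}-G(H)}(v)>\Delta/4\mid\mathbf{x}]\le\eta/2$, and the desired bound follows from $\mathbf{D}_v^H\le d^o_{\mathbf{G}-G(H)}(v)$.

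The entire argument is essentially bookkeeping once the independence-and-symmetry observation is in hand; the main (minor) subtlety is recognizing that the balancedness of $V_b(H,\mathbf{x})$ under $\mathcal{E}_{H,b}$ kills all dependence on $\mathbf{x}_v$, making $P_s$ well-defined as a single function. There is no substantial technical obstacle.
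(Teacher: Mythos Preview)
Your proof is correct and follows essentially the same approach as the paper's. The paper defines $P_s(\ell)=\sum_{\mathsf{d}\le\ell}P_{wb}(\mathsf{d})$ via the explicit formula from \Cref{lem:lem_prob_nice_d}, whereas you re-derive the key content of that lemma inline (the balancedness of $V_b(H,\mathbf{x})$ under $\mathcal{E}_{H,b}$ makes the law of $\mathbf{D}_v^H$ independent of $\mathbf{x}$ and $v$, and the $\mathbf{D}_v^H$ are conditionally independent); the factorization then follows identically. For the numerical bounds, the paper uses $(1-2d/n)^n\ge e^{-4d}$ where you use $\log(1-x)\ge -x-x^2$, and for $\ell\ge\Delta/4$ the paper applies \Cref{lem:lem_bound_prob_large_outside_deg} directly to $H$ (its hypothesis $v\in\mathcal{S}_1\cap\mathcal{S}_{\ge2}$ is not actually used in that lemma's proof), so your detour through $H^{\flat}$ is unnecessary but harmless.
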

\begin{proof}
The proof can be found in \Cref{subsec:subsec_prob_safety_pleasant}.
\end{proof}

\begin{lemma}
\label{lem:lem_Prob_nice_safe_lower_bound}
Let $H$ be an $(s,t)$-pleasant multigraph, where $t=K\log n$. Let $P_s^H$ be as in \Cref{lem:lem_main_lemma_pleasant_wb}. Assume that $A>1$. We have the following:
\begin{itemize}
\item If $d_1^H(v)+d_{\geq 2}^H(v)\leq \Delta$ for every $v\in V(H)$, then for $n$ large enough, we have $P_s^H\geq \frac{1}{n^{\frac{2K}{A}}}$.
\item If there exists $v\in V(H)$ such that $d_1^H(v)+d_{\geq 2}^H(v)> \Delta$, then $P_s^H=0$.
\end{itemize}
\end{lemma}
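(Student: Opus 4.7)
The plan is to apply Lemma~\ref{lem:lem_Prob_safety_pleasant_wb}, which (for $n$ large enough and $H$ pleasant) gives the product form
\[
P_s^H \;=\; \prod_{v \in V(H)} P_s\bigl(\Delta - d_1^H(v) - d_{\geq 2}^H(v)\bigr),
\]
and then to bound this product factor-by-factor using the three properties of $P_s$ listed in that same lemma.

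For the second bullet, if some $v_0 \in V(H)$ satisfies $d_1^H(v_0) + d_{\geq 2}^H(v_0) > \Delta$, then the argument $\Delta - d_1^H(v_0) - d_{\geq 2}^H(v_0)$ is strictly negative, so the factor at $v_0$ vanishes by the first property of $P_s$, forcing $P_s^H = 0$.

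For the first bullet, I would partition $V(H) = V_{\text{low}} \sqcup V_{\text{high}}$, where
\[
V_{\text{low}} = \Bigl\{ v \in V(H) : d_1^H(v) + d_{\geq 2}^H(v) \leq \tfrac{3\Delta}{4} \Bigr\},
\qquad V_{\text{high}} = V(H) \setminus V_{\text{low}}.
\]
For $v \in V_{\text{low}}$ the argument to $P_s$ is at least $\Delta/4$, so the corresponding factor is at least $1 - \eta/2$; for $v \in V_{\text{high}}$ the argument is still nonnegative by the case hypothesis, so the factor is at least $e^{-4d}$.

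The main quantitative step is controlling $|V_{\text{high}}|$. Every vertex in $V_{\text{high}}$ has underlying-graph degree strictly greater than $3\Delta/4$. Since an $(s,t)$-pleasant multigraph has at most $st$ multi-edges (hence at most $st$ distinct edges in $G(H)$), we have $\sum_{v \in V(H)} ( d_1^H(v) + d_{\geq 2}^H(v) ) \leq 2st$, which yields $|V_{\text{high}}| \leq \tfrac{8st}{3\Delta}$. Plugging in the defining inequality $\Delta \geq 40 A s d$ from the truncation-threshold formula, this collapses to $|V_{\text{high}}| \leq \tfrac{t}{15 A d}$. Combining everything,
\[
P_s^H \;\geq\; (1 - \eta/2)^{|V(H)|} \cdot e^{-4d\,|V_{\text{high}}|} \;\geq\; (1 - \eta/2)^{st} \cdot e^{-4t/(15A)}.
\]
Since $\eta$ is doubly-exponentially small in the problem parameters, $(1 - \eta/2)^{st} \geq 1/2$ for $n$ large. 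Using $t = K \log n$, the second factor equals $n^{-4K/(15A)}$, which is strictly larger than $n^{-2K/A}$ because $4/15 < 2$. Hence $P_s^H \geq n^{-2K/A}$ for $n$ large enough, as required. The only point requiring minor care is that $\Delta$ dominates $A s d$, which is directly encoded in the defining formula for $\Delta$; no further obstacle arises.
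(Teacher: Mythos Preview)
Your overall strategy matches the paper's: apply the product formula from Lemma~\ref{lem:lem_Prob_safety_pleasant_wb}, split the vertices into a ``low-degree'' set (where $P_s\geq 1-\eta/2$) and a ``high-degree'' set (where $P_s\geq e^{-4d}$), and bound the size of the high-degree set via the handshake inequality combined with $\Delta\geq 40Asd$. The paper partitions according to $d_{\geq 2}^H(v)$ alone (using that $d_1^H(v)\leq 4$ for pleasant multigraphs), while you partition according to the total $d_1^H(v)+d_{\geq 2}^H(v)$; both work.

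There is one genuine error. You assert that $(1-\eta/2)^{st}\geq 1/2$ for $n$ large, reasoning that $\eta$ is tiny. But $\eta$ is a fixed constant in $s,A,\epsilon$ that does \emph{not} depend on $n$, whereas $st=sK\log n\to\infty$; hence $(1-\eta/2)^{st}\to 0$, and your inequality is false. The correct treatment (this is exactly what the paper does) is to use the crude bound $\eta\leq \tfrac{1}{As}$ from \eqref{eq:eq_def_eta}, which gives
\[
(1-\eta/2)^{st}\;\geq\;\Bigl(1-\tfrac{1}{2As}\Bigr)^{st}\;\geq\;\Bigl(1+\tfrac{1}{As}\Bigr)^{-st}\;\geq\;e^{-t/A}\;=\;n^{-K/A}.
\]
Plugging this into your estimate yields $P_s^H\geq n^{-K/A}\cdot n^{-4K/(15A)}=n^{-19K/(15A)}\geq n^{-2K/A}$, so the conclusion survives once this step is repaired.
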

\begin{proof}
The proof can be found in \Cref{subsec:subsec_prob_safety_pleasant}.
\end{proof}

Now we are ready to prove tight lower and upper bounds on $\mathbb{E}\big[\overline{\mathbf{Y}}_H\big]$ for every pleasant multigraph.

\begin{lemma}
\label{lem:lem_main_lemma_pleasant_tight_bounds}
Let $H$ be an $(s,t)$-pleasant multigraph, where $t=K\log n$. If $A>\max\left\{1,100K,\frac{100}{K}\right\}$ and $n$ is large enough, then
\begin{itemize}
\item If $|E_{\geq 2}'''(H)|\leq\frac{\log n}{12\log\left(\frac{2}{\epsilon}\right)}$ and $d_1^H(v)+d_{\geq 2}^H(v)\leq \Delta$ for every $v\in V(H)$, then for $n$ large enough, we have
\begin{equation}
\label{eq:eq_lem_main_lemma_pleasant_tight_bounds_1}
\begin{aligned}
P_s^H\left(1 -\frac{1}{n^{\frac{1}{16}}}\right) &\left(\frac{\epsilon d}{2n}\right)^{|E_1(H)|+|E_{\geq 2}'''(H)|}\left(\frac{d}{n}\right)^{|E_{\geq 2}(H)|-|E_{\geq 2}'''(H)|}\\
&\leq \mathbb{E}\big[\overline{\mathbf{Y}}_H\big]\leq P_s^H\left(1 +\frac{1}{n^{\frac{1}{16}}}\right) \left(\frac{\epsilon d}{2n}\right)^{|E_1(H)|+|E_{\geq 2}'''(H)|}\left(\frac{d}{n}\right)^{|E_{\geq 2}(H)|-|E_{\geq 2}'''(H)|}.
\end{aligned}
\end{equation}
\item If $|E_{\geq 2}'''(H)|>\frac{\log n}{12\log\left(\frac{2}{\epsilon}\right)}$ or there exists $v\in V(H)$ such that $d_1^H(v)+d_{\geq 2}^H(v)> \Delta$, then for $n$ large enough, we have
\end{itemize}
\begin{equation}
\label{eq:eq_lem_main_lemma_pleasant_tight_bounds_2}
\begin{aligned}
|\mathbb{E}\big[\overline{\mathbf{Y}}_H\big]|\leq \frac{4}{n^{\frac{1}{12}}}\left(\frac{\epsilon d}{2n}\right)^{|E_1(H)|}\left(\frac{d}{n}\right)^{|E_{\geq 2}(H)|},
\end{aligned}
\end{equation}
where $P_s^H$ is as in \Cref{lem:lem_main_lemma_pleasant_wb}.
\end{lemma}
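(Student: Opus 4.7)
The plan is to combine the decomposition established immediately above the lemma statement with the four preceding lemmas. Conditioning on the well-behaved event and further splitting through the decomposition \cref{eq:eq_decompose_X_H_W_H} of $\mathbb{P}[\mathcal{E}_{wb,H}\mid\mathbf{x}]$ into the ``constant'' polynomial $P_{1,H}^{wb}(\mathbf{x})$ and the ``correction'' polynomial $P_{2,H}^{wb}(\mathbf{x})$, I would write
\[
\mathbb{E}\bigl[\overline{\mathbf{Y}}_H\bigr] = \mathbb{E}\bigl[\mathbb{E}[\overline{\mathbf{Y}}_H\mid\mathbf{x},\mathcal{E}_{wb,H}]\cdot P_{1,H}^{wb}(\mathbf{x})\bigr] + \mathbb{E}\bigl[\mathbb{E}[\overline{\mathbf{Y}}_H\mid\mathbf{x},\mathcal{E}_{wb,H}]\cdot P_{2,H}^{wb}(\mathbf{x})\bigr] + \mathbb{E}[\overline{\mathbf{Y}}_H\mid\mathcal{E}_{wb,H}^c]\cdot\mathbb{P}[\mathcal{E}_{wb,H}^c],
\]
where the first summand is the main term (bounded by Lemma~\ref{lem:lem_main_lemma_pleasant_wb}) and the other two are negligible contributions (bounded by Lemma~\ref{lem:lem_upper_bound_nice_W_2} and Lemma~\ref{lem:lem_upper_bound_nice_Wc}). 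Setting $B_0:=(\epsilon d/(2n))^{|E_1(H)|}(d/n)^{|E_{\geq 2}(H)|}$ and $B_1:=(\epsilon/2)^{|E_{\geq 2}'''(H)|}B_0$, these three lemmas give a main term $P_s^H B_1 \pm (2/\sqrt{n})B_1$ and total error term $\leq 3n^{-1/6}B_0$.

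For the first regime, where $|E_{\geq 2}'''(H)|\leq \log n/(12\log(2/\epsilon))$ and every vertex satisfies $d^H_1(v)+d^H_{\geq 2}(v)\leq\Delta$, Lemma~\ref{lem:lem_Prob_nice_safe_lower_bound} gives $P_s^H\geq n^{-2K/A}$, while the hypothesis on $|E_{\geq 2}'''(H)|$ combined with the identity $(\epsilon/2)^{\log n/(12\log(2/\epsilon))}=n^{-1/12}$ yields $B_1/B_0 \geq n^{-1/12}$. Hence the main term $P_s^H B_1$ is at least $n^{-2K/A-1/12}B_0$, while the total additive error is $(2/\sqrt{n})B_1 + 3n^{-1/6}B_0 = O(n^{-1/6})B_0$. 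I would then verify that the hypothesis $A > 100K$ forces the relative error $O(n^{2K/A+1/12-1/6})$ to be bounded by $n^{-1/16}$, i.e.\ that $2K/A \leq 1/6 - 1/12 - 1/16 = 1/48$, which indeed follows from $A \geq 96K$. This yields \cref{eq:eq_lem_main_lemma_pleasant_tight_bounds_1}.

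For the second regime I would separate the two triggering conditions. If some vertex has $d^H_1(v)+d^H_{\geq 2}(v)>\Delta$, Lemma~\ref{lem:lem_Prob_nice_safe_lower_bound} says $P_s^H=0$, so the contribution of the significant term collapses to the $(2/\sqrt{n})B_1\leq(2/\sqrt{n})B_0$ slack from Lemma~\ref{lem:lem_main_lemma_pleasant_wb}, and adding the $O(n^{-1/6})B_0$ error gives $|\mathbb{E}[\overline{\mathbf{Y}}_H]|\leq 4n^{-1/12}B_0$ for $n$ large enough. If instead $|E_{\geq 2}'''(H)| > \log n/(12\log(2/\epsilon))$, the key inequality is $(\epsilon/2)^{|E_{\geq 2}'''(H)|} < n^{-1/12}$, so $B_1 < n^{-1/12}B_0$; using $P_s^H\leq 1$, the main term is at most $(1+o(1))n^{-1/12}B_0$, and again combining with the error bound yields \cref{eq:eq_lem_main_lemma_pleasant_tight_bounds_2}.

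The proof is essentially bookkeeping on top of the lemmas already established in this section, so I do not expect a conceptual obstacle. The only delicate step is tuning the constant $A$ so that the relative error guarantee $n^{-1/16}$ absorbs both the factor $n^{2K/A}$ coming from the lower bound on $P_s^H$ in Lemma~\ref{lem:lem_Prob_nice_safe_lower_bound} and the factor $n^{1/12}$ coming from the worst allowed value of $(\epsilon/2)^{-|E_{\geq 2}'''(H)|}$; the hypothesis $A>100K$ in the lemma statement is precisely what makes this balance work.
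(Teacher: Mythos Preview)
Your proposal is correct and follows essentially the same approach as the paper: decompose $\mathbb{E}[\overline{\mathbf{Y}}_H]$ via the well-behaved event and the split $P_{1,H}^{wb}+P_{2,H}^{wb}$, apply Lemmas~\ref{lem:lem_upper_bound_nice_Wc}, \ref{lem:lem_upper_bound_nice_W_2}, and \ref{lem:lem_main_lemma_pleasant_wb} to obtain a main term $P_s^H B_1$ with total additive error $O(n^{-1/6})B_0 + O(n^{-1/2})B_1$, then in the first regime use $P_s^H\geq n^{-2K/A}$ and $B_0/B_1\leq n^{1/12}$ to convert this into a relative error bounded by $n^{-1/16}$, while in the second regime use either $P_s^H=0$ or $B_1\leq n^{-1/12}B_0$ to collapse everything into the absolute bound $4n^{-1/12}B_0$. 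The only cosmetic difference is that the paper rewrites the error terms in units of $B_1$ (picking up the factor $(2/\epsilon)^{|E_{\geq 2}'''(H)|}$ explicitly) before doing the case split, whereas you keep $B_0$ and $B_1$ separate; the arithmetic check $2K/A\leq 1/48$ is the same.
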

\begin{proof}
We have:
\begin{align*}
\mathbb{E}\big[\overline{\mathbf{Y}}_H\big]&=\mathbb{E}\big[\overline{\mathbf{Y}}_H\big|\mathcal{E}_{wb,H}\big]\cdot\mathbb{P}[\mathcal{E}_{wb,H}]+\mathbb{E}\big[\overline{\mathbf{Y}}_H\big|\mathcal{E}_{wb,H}^c\big]\cdot \mathbb{P}[\mathcal{E}_{wb,H}^c]\\
&=\mathbb{E}\left[\mathbb{E}\big[\overline{\mathbf{Y}}_H\big|\mathbf{x},\mathcal{E}_{wb,H}\big]\cdot P_{1,H}^{wb}(\mathbf{x})\right]+\mathbb{E}\left[\mathbb{E}\big[\overline{\mathbf{Y}}_H\big|\mathbf{x},\mathcal{E}_{wb,H}\big]\cdot P_{2,H}^{wb}(\mathbf{x})\right]\\
&\quad\quad\quad\quad\quad\quad\quad\quad\quad\quad\quad\quad\quad\quad\quad\quad\quad\quad\quad\quad\quad+\mathbb{E}\big[\overline{\mathbf{Y}}_H\big|\mathcal{E}_{wb,H}^c\big]\cdot\mathbb{P}[\mathcal{E}_{wb,H}^c],
\end{align*}
where the last equality follows from \cref{eq:eq_decompose_X_H_W_H}. By combining this with \Cref{lem:lem_upper_bound_nice_Wc}, \Cref{lem:lem_upper_bound_nice_W_2} and \Cref{lem:lem_main_lemma_pleasant_wb}, we get
\begin{equation}
\label{eq:eq_lem_main_lemma_pleasant_tight_bounds_3}
\begin{aligned}
&\left(P_s^H -\frac{2}{\sqrt{n}}-\frac{3}{n^{\frac{1}{6}}}\cdot\left(\frac{2}{\epsilon}\right)^{|E_{\geq 2}'''(H)|}\right) \left(\frac{\epsilon d}{2n}\right)^{|E_1(H)|+|E_{\geq 2}'''(H)|}\left(\frac{d}{n}\right)^{|E_{\geq 2}(H)|-|E_{\geq 2}'''(H)|}\\
&\quad\quad\quad\quad\leq \mathbb{E}\big[\overline{\mathbf{Y}}_H\big]\leq \left(P_s^H +\frac{2}{\sqrt{n}}+\frac{3}{n^{\frac{1}{6}}}\cdot\left(\frac{2}{\epsilon}\right)^{|E_{\geq 2}'''(H)|}\right) \left(\frac{\epsilon d}{2n}\right)^{|E_1(H)|+|E_{\geq 2}'''(H)|}\left(\frac{d}{n}\right)^{|E_{\geq 2}(H)|-|E_{\geq 2}'''(H)|},
\end{aligned}
\end{equation}

We distinguish between two cases:
\begin{itemize}
\item[(1)] If $|E_{\geq 2}'''(H)|\leq\frac{\log n}{12\log\left(\frac{2}{\epsilon}\right)}$, we have
$$\frac{3}{n^{\frac{1}{6}}}\cdot\left(\frac{2}{\epsilon}\right)^{|E_{\geq 2}'''(H)|}\leq \frac{3}{n^{\frac{1}{6}}}\cdot n^{\frac{1}{12}}=\frac{3}{n^{\frac{1}{12}}}.$$
In this case, we get 
\begin{equation}
\label{eq:eq_lem_main_lemma_pleasant_tight_bounds_4}
\begin{aligned}
\left(P_s^H -\frac{4}{n^{\frac{1}{12}}}\right) &\left(\frac{\epsilon d}{2n}\right)^{|E_1(H)|+|E_{\geq 2}'''(H)|}\left(\frac{d}{n}\right)^{|E_{\geq 2}(H)|-|E_{\geq 2}'''(H)|}\\
&\leq \mathbb{E}\big[\overline{\mathbf{Y}}_H\big]\leq \left(P_s^H +\frac{4}{n^{\frac{1}{12}}}\right) \left(\frac{\epsilon d}{2n}\right)^{|E_1(H)|+|E_{\geq 2}'''(H)|}\left(\frac{d}{n}\right)^{|E_{\geq 2}(H)|-|E_{\geq 2}'''(H)|},
\end{aligned}
\end{equation}
We will further split case (1) into two subcases:
\begin{itemize}
\item[(i)] If $d_1^H(v)+d_{\geq 2}^H(v)\leq \Delta$ for every $v\in V(H)$, then \cref{lem:lem_Prob_nice_safe_lower_bound} implies that for $n$ large enough, we have $P_s^H\geq \frac{1}{n^{\frac{2K}{A}}}$. Since $A>100 K$, we have
$$\frac{4}{n^{\frac{1}{12}}P_s^H}\leq \frac{4 n^{\frac{2K}{A}}}{n^{\frac{1}{12}}}\leq\frac{4n^{\frac{1}{50}}}{n^{\frac{1}{12}}}\leq\frac{1}{n^{\frac{1}{16}}},$$
where the last inequality is true for $n$ large enough. By combining this with \cref{eq:eq_lem_main_lemma_pleasant_tight_bounds_4}, we get \cref{eq:eq_lem_main_lemma_pleasant_tight_bounds_1}.
\item[(ii)] If there exists $v\in V(H)$ such that $d_1^H(v)+d_{\geq 2}^H(v)> \Delta$, then \cref{lem:lem_Prob_nice_safe_lower_bound} implies that $P_s^H=0$. By combining this with \cref{eq:eq_lem_main_lemma_pleasant_tight_bounds_4} and using the fact that $\frac{\epsilon}{2}\leq 1$, we get \cref{eq:eq_lem_main_lemma_pleasant_tight_bounds_2}.
\end{itemize}
\item[(2)] If $|E_{\geq 2}'''(H)|>\frac{\log n}{12\log\left(\frac{2}{\epsilon}\right)}$, we have
$$\left(\frac{\epsilon}{2}\right)^{|E_{\geq 2}'''(H)|}=\frac{1}{\left(\frac{2}{\epsilon}\right)^{|E_{\geq 2}'''(H)|}}\leq \frac{1}{n^{\frac{1}{12}}},$$
hence, \cref{eq:eq_lem_main_lemma_pleasant_tight_bounds_3} implies that
\begin{align*}
\big|\mathbb{E}\big[\overline{\mathbf{Y}}_H\big]\big|&\leq \left(\left(P_s^H +\frac{2}{\sqrt{n}}\right)\left(\frac{\epsilon}{2}\right)^{|E_{\geq 2}'''(H)|}+\frac{3}{n^{\frac{1}{6}}}\right) \left(\frac{\epsilon d}{2n}\right)^{|E_1(H)|}\left(\frac{d}{n}\right)^{|E_{\geq 2}(H)|}\\
&\leq \left(\left(P_s^H +\frac{2}{\sqrt{n}}\right)\frac{1}{n^{\frac{1}{12}}}+\frac{3}{n^{\frac{1}{6}}}\right) \left(\frac{\epsilon d}{2n}\right)^{|E_1(H)|}\left(\frac{d}{n}\right)^{|E_{\geq 2}(H)|}\\
&\leq \frac{4}{n^{\frac{1}{12}}} \left(\frac{\epsilon d}{2n}\right)^{|E_1(H)|}\left(\frac{d}{n}\right)^{|E_{\geq 2}(H)|}.
\end{align*}
\end{itemize}
\end{proof}

The following lemma provides an upper bound on the expectation $\mathbb{E}\big[\overline{\mathbf{Y}}_H\big]$ for a significant $(s,t)$-pleasant multigraph $H$ in terms of the expectation $\mathbb{E}\big[\overline{\mathbf{Y}}_C\big]$ for a cycle $C$ with $st$ edges of multiplicity 1.

\begin{lemma}
\label{lem:lem_lower_bound_nice}
Assume that $A>\max\left\{1,100K,\frac{100}{K}\right\}$, and let $H\in \nbsaw{s}{t}$.
\begin{itemize}
\item If $d_1^H(v)+d_{\geq 2}^H(v)\leq \Delta$ for every $v\in V(H)$, then for $n$ large enough, we have
$$\mathbb{E}\big[\overline{\mathbf{Y}}_H\big]\geq \frac{1}{2n^{\frac{2K}{A}}}\cdot \left(\frac{\epsilon d}{2n}\right)^{|E_1(H)|}\cdot\left(\frac{d}{n}\right)^{|E_{\geq 2}(H)|}.$$
\item If there exists $v\in V(H)$ such that $d_1^H(v)+d_{\geq 2}^H(v)> \Delta $, then for $n$ large enough, we have
$$\big|\mathbb{E}\big[\overline{\mathbf{Y}}_H\big]\big|\leq \frac{4}{n^{\frac{1}{12}}}\cdot \left(\frac{\epsilon d}{2n}\right)^{|E_1(H)|}\cdot\left(\frac{d}{n}\right)^{|E_{\geq 2}(H)|}.$$
\end{itemize}
\end{lemma}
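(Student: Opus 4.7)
The plan is to deduce this lemma as a direct corollary of the tight bounds already established in Lemma~\ref{lem:lem_main_lemma_pleasant_tight_bounds}, after observing that every $H\in \nbsaw{s}{t}$ is $(s,t)$-pleasant with a particularly trivial agreeable decomposition.

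First I would verify that any nice $(s,t)$-block self-avoiding walk $H$ is $(s,t)$-pleasant in the sense of Definition~\ref{def:def_pleasant_multigraphs}. By Definition~\ref{def:def_nice_bsaw}, $E_1(H)$ is a single cycle, $E_{\geq 2}(H)$ forms a forest, no path of multiplicity-$2$ edges connects two vertices of $V(E_1(H))$, and $\mathcal{L}_{\geq 2}(H)=\varnothing$; moreover $|E_1(H)|\geq t/A$. These conditions imply that the induced sub-multigraph $H(V(E_1(H)))$ is exactly the cycle $E_1(H)$, with no multiplicity-$2$ chords inside $V(E_1(H))$. Hence one can take $r_H=1$ and $H^{(1)}=H(V(E_1(H)))$, which is type-$1$ agreeable. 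Every remaining edge of $H$ lies in one of the trees attached to this cycle and so has multiplicity at least $2$, and after contracting $H^{(1)}$ the multigraph $H$ becomes a forest, so the only cycle of $H$ is inside $H^{(1)}$ and contains at least $t/A$ multiplicity-$1$ edges. This is precisely the definition of being $(s,t)$-pleasant.

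Next I would observe that $E_{\geq 2}'''(H)=\varnothing$. Indeed, by Definition~\ref{def:def_E_geq2_triple_prime}, for a type-$1$ agreeable component we have $E_{\geq 2}'''(H^{(1)})=E_{\geq 2}(H^{(1)})$, and since $H^{(1)}$ is a cycle whose edges are all in $E_1(H)$, we get $E_{\geq 2}(H^{(1)})=\varnothing$. Hence $|E_{\geq 2}'''(H)|=0\leq \tfrac{\log n}{12\log(2/\epsilon)}$ automatically, so only the first bullet of Lemma~\ref{lem:lem_main_lemma_pleasant_tight_bounds} can be relevant for the regime of small degrees.

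Finally I would plug everything into Lemma~\ref{lem:lem_main_lemma_pleasant_tight_bounds}. Denote $T:=\bigl(\tfrac{\epsilon d}{2n}\bigr)^{|E_1(H)|}\bigl(\tfrac{d}{n}\bigr)^{|E_{\geq 2}(H)|}$. In the first case (all degrees $\leq \Delta$) the lemma gives
\[
P_s^H\Bigl(1-\tfrac{1}{n^{1/16}}\Bigr)\,T\;\leq\;\mathbb{E}\bigl[\overline{\mathbf{Y}}_H\bigr]\;\leq\;P_s^H\Bigl(1+\tfrac{1}{n^{1/16}}\Bigr)\,T,
\]
and Lemma~\ref{lem:lem_Prob_nice_safe_lower_bound} supplies the lower bound $P_s^H\geq n^{-2K/A}$. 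For $n$ large, $(1-n^{-1/16})n^{-2K/A}\geq \tfrac{1}{2}n^{-2K/A}$, yielding the first stated bound. In the second case (some vertex has $d_1^H(v)+d_{\geq 2}^H(v)>\Delta$) the second bullet of Lemma~\ref{lem:lem_main_lemma_pleasant_tight_bounds} gives exactly $|\mathbb{E}[\overline{\mathbf{Y}}_H]|\leq \tfrac{4}{n^{1/12}}\,T$.

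There is essentially no obstacle left here since the heavy lifting (Lemmas~\ref{lem:lem_main_lemma_pleasant_tight_bounds} and \ref{lem:lem_Prob_nice_safe_lower_bound}) has already been done; the only point that needs care is the structural verification that $\nbsaw{s}{t}\subseteq\{(s,t)\text{-pleasant multigraphs}\}$ with $E_{\geq 2}'''(H)=\varnothing$, which is a direct check against the definitions.
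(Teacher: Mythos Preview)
Your proof is correct and is essentially the same as the paper's: the paper also notes that the result is a direct corollary of Lemma~\ref{lem:lem_main_lemma_pleasant_tight_bounds}, Lemma~\ref{lem:lem_Prob_nice_safe_lower_bound}, and the fact that every $H\in\nbsaw{s}{t}$ is $(s,t)$-pleasant with $E_{\geq 2}'''(H)=\varnothing$. Your additional explicit verification that $H^{(1)}=H(V(E_1(H)))$ is a type-1 agreeable component is a helpful expansion of what the paper leaves implicit.
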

\begin{proof}
This is a direct corollary of \Cref{lem:lem_main_lemma_pleasant_tight_bounds}, \Cref{lem:lem_Prob_nice_safe_lower_bound}, and the fact that every $H\in\nbsaw{s}{t}$ is an $(s,t)$-pleasant multigraph that satisfies $E_{\geq 2}'''(H)=\varnothing$.
\end{proof}

%\begin{lemma}
%\label{lem:lem_lower_bound_nice}
%Assume that $A>\max\left\{1,100K,\frac{100}{K}\right\}$, and let $H\in \nbsaw{s}{t}$.
%\begin{itemize}
%\item If $d_1^H(v)+d_{\geq 2}^H(v)\leq \Delta$ for every $v\in V(H)$, then for $n$ large enough, we have
%$$\mathbb{E}\big[\overline{\mathbf{Y}}_H\big]\geq \frac{1}{2n^{\frac{2K}{A}}}\cdot \left(\frac{\epsilon d}{2n}\right)^{|E_1(H)|}\cdot\left(\frac{d}{n}\right)^{|E_{\geq 2}(H)|}.$$
%\item If there exists $v\in V(H)$ such that $d_1^H(v)+d_{\geq 2}^H(v)> \Delta $, then for $n$ large enough, we have
%$$\big|\mathbb{E}\big[\overline{\mathbf{Y}}_H\big]\big|\leq \frac{4}{n^{\frac{1}{12}}}\cdot \left(\frac{\epsilon d}{2n}\right)^{|E_1(H)|}\cdot\left(\frac{d}{n}\right)^{|E_{\geq 2}(H)|}.$$
%\end{itemize}
%\end{lemma}
%\begin{proof}
%This is a direct corollary of \Cref{lem:lem_main_lemma_pleasant_tight_bounds}, \Cref{lem:lem_Prob_nice_safe_lower_bound}, and the fact that every $H\in\nbsaw{s}{t}$ is an $(s,t)$-pleasant multigraph that satisfies $E_{\geq 2}'''(H)=\varnothing$.
%\end{proof}

\subsection{Bounds for products of block self-avoiding-walks}\label{sec:bounds-products-bsaws}
\begin{lemma}
\label{lem:lem_E_Y_H_Product_pleasant}
Let $H=\Ho\oplus \Ht$, where $\Ho$ and $\Ht$ are two $(s,t)$-pleasant multigraphs such that $V(\Ho)\cap V(\Ht)=\varnothing$, and $t=K\log n$. Assume that $|E_{\geq 2}'''(H)|\leq\frac{\log n}{12\log\left(\frac{2}{\epsilon}\right)}$ and $d_1^H(v)+d_{\geq 2}^H(v)\leq \Delta$ for every $v\in V(H)$. If $A>\max\left\{1,200K,\frac{100}{K}\right\}$, then for $n$ large enough, we have
$$\mathbb{E}\big[\overline{\mathbf{Y}}_H\big]\leq\left(1+\frac{4}{n^{\frac{1}{16}}}\right)\cdot\mathbb{E}\big[\overline{\mathbf{Y}}_{\Ho}\big]\cdot \mathbb{E}\big[\overline{\mathbf{Y}}_{\Ht}\big].$$
\end{lemma}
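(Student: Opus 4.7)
The plan is to deduce this multiplicativity statement directly from the tight bounds already proved in \cref{lem:lem_main_lemma_pleasant_tight_bounds}, using the fact that all the relevant quantities factorize across a vertex-disjoint union.

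First, I would observe that since $V(H^{(1)})\cap V(H^{(2)})=\varnothing$, the multigraph $H=H^{(1)}\oplus H^{(2)}$ is itself $(s,t)$-pleasant with agreeable components equal to the disjoint union of the agreeable components of $H^{(1)}$ and $H^{(2)}$. This immediately yields the additive identities
\[
|E_1(H)|=|E_1(H^{(1)})|+|E_1(H^{(2)})|,\quad |E_{\geq 2}(H)|=|E_{\geq 2}(H^{(1)})|+|E_{\geq 2}(H^{(2)})|,
\]
and (via \Cref{def:def_E_geq2_triple_prime}) $|E_{\geq 2}'''(H)|=|E_{\geq 2}'''(H^{(1)})|+|E_{\geq 2}'''(H^{(2)})|$. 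Combined with the product formula $P_s^H=\prod_{v\in V(H)}P_s(\Delta-d^H_1(v)-d^H_{\geq 2}(v))$ from \Cref{lem:lem_Prob_safety_pleasant_wb} and the vertex-disjointness, this gives the factorization $P_s^H=P_s^{H^{(1)}}\cdot P_s^{H^{(2)}}$.

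Next, I would check that the hypotheses of \Cref{lem:lem_main_lemma_pleasant_tight_bounds} case (i) are satisfied for $H^{(1)}$, $H^{(2)}$ and $H$ itself. The degree condition $d_1+d_{\geq 2}\leq\Delta$ for every vertex passes from $H$ down to $H^{(i)}$. Since $|E_{\geq 2}'''(H^{(i)})|\leq |E_{\geq 2}'''(H)|\leq\frac{\log n}{12\log(2/\epsilon)}$, the bound on $|E_{\geq 2}'''|$ also passes to both components. Applying \cref{eq:eq_lem_main_lemma_pleasant_tight_bounds_1} gives an upper bound on $\mathbb{E}[\overline{\mathbf{Y}}_H]$ and lower bounds on $\mathbb{E}[\overline{\mathbf{Y}}_{H^{(i)}}]$, and by \Cref{lem:lem_Prob_nice_safe_lower_bound} each $P_s^{H^{(i)}}\geq n^{-2K/A}>0$ so the lower bounds are strictly positive (and in particular $\mathbb{E}[\overline{\mathbf{Y}}_{H^{(i)}}]>0$, so dividing by them is legitimate).

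Finally, I would take the ratio. Using the factorizations above, all the $\left(\frac{\epsilon d}{2n}\right)$ and $\left(\frac{d}{n}\right)$ factors and the $P_s$ factors cancel, leaving
\[
\frac{\mathbb{E}[\overline{\mathbf{Y}}_H]}{\mathbb{E}[\overline{\mathbf{Y}}_{H^{(1)}}]\cdot\mathbb{E}[\overline{\mathbf{Y}}_{H^{(2)}}]}\leq\frac{1+n^{-1/16}}{(1-n^{-1/16})^2}.
\]
An elementary estimate shows that $\frac{1+n^{-1/16}}{(1-n^{-1/16})^2}\leq 1+\frac{4}{n^{1/16}}$ for $n$ sufficiently large, which finishes the proof. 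There is no real obstacle here; the work was all done in \Cref{lem:lem_main_lemma_pleasant_tight_bounds,lem:lem_Prob_safety_pleasant_wb,lem:lem_Prob_nice_safe_lower_bound}, and this lemma is essentially a bookkeeping consequence of the multiplicativity of $P_s^H$ and of the exponent counts across the disjoint union. The strengthened assumption $A>200K$ (versus $A>100K$ used in \cref{lem:lem_main_lemma_pleasant_tight_bounds}) is only to ensure that after dividing by two near-one factors and doing the elementary estimate, the final multiplicative error stays within $1+4n^{-1/16}$.
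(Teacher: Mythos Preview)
Your approach is the same as the paper's: factor $P_s^H=P_s^{H_{(1)}}P_s^{H_{(2)}}$ via \Cref{lem:lem_Prob_safety_pleasant_wb}, note the additivity of $|E_1|,|E_{\geq 2}|,|E_{\geq 2}'''|$, apply the two-sided bound \cref{eq:eq_lem_main_lemma_pleasant_tight_bounds_1} to all three multigraphs, and cancel.

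One correction: $H$ is not $(s,t)$-pleasant but $(s,2t)$-pleasant, since each $H_{(i)}$ may have up to $st$ vertices and $st$ multi-edges. This is the actual reason for the strengthened hypothesis $A>200K$: applying \Cref{lem:lem_main_lemma_pleasant_tight_bounds} to $H$ requires $A>100K'$ with $K'=2K$ (because $2t=(2K)\log n$). Your stated explanation---that the extra slack is needed for the elementary estimate on $(1+n^{-1/16})/(1-n^{-1/16})^2$---is not the right one; that inequality holds for large $n$ regardless of $A$.
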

\begin{proof}
From \cref{lem:lem_Prob_safety_pleasant_wb}, we have
\begin{equation}
\label{eq:eq_lem_E_Y_H_Product_pleasant_1}
\begin{aligned}
P_s^H&=\prod_{v\in V(H)}P_s\big(\Delta-d_1^H(v)-d_{\geq 2}^H(v)\big)\\
&=\left(\prod_{v\in V(\Ho)}P_s\big(\Delta-d_1^H(v)-d_{\geq 2}^H(v)\big)\right)\cdot \left(\prod_{v\in V(\Ht)}P_s\big(\Delta-d_1^H(v)-d_{\geq 2}^H(v)\big)\right)\\
&=\left(\prod_{v\in V(\Ho)}P_s\big(\Delta-d_1^{\Ho}(v)-d_{\geq 2}^{\Ho}(v)\big)\right)\cdot \left(\prod_{v\in V(\Ht)}P_s\big(\Delta-d_1^{\Ht}(v)-d_{\geq 2}^{\Ht}(v)\big)\right)\\
&= P_s^{\Ho}\cdot P_s^{\Ht}.
\end{aligned}
\end{equation}

Since $\Ho$ and $\Ht$ are $(s,t)$-pleasant and $V(\Ho)\cap V(\Ht)=\varnothing$, the multigraph $H=\Ho\oplus \Ht$ is $(s,2t)$-pleasant, i.e., $H$ is $(s,2K\log n)$-pleasant. Now since $V(H_1)\cap V(H_2)=\varnothing$, $|E_{\geq 2}'''(H)|\leq \frac{\log n}{12\log\left(\frac{2}{\epsilon}\right)}$ and $d_1^H(v)+d_{\geq 2}^H(v)\leq \Delta$ for every $v\in V(H)$, we have:
\begin{itemize}
\item For every $v\in V(\Ho)$, we have $d_1^{\Ho}(v)+d_{\geq 2}^{\Ho}(v)=d_1^{H}(v)+d_{\geq 2}^{H}(v)\leq \Delta$.
\item For every $v\in V(\Ht)$, we have $d_1^{\Ht}(v)+d_{\geq 2}^{\Ht}(v)=d_1^{H}(v)+d_{\geq 2}^{H}(v)\leq \Delta$.
\item Since $|E_{\geq 2}'''(H)|=|E_{\geq 2}'''(\Ho)|+|E_{\geq 2}'''(\Ht)|$, we have $|E_{\geq 2}'''(\Ho)|\leq  \frac{\log n}{12\log\left(\frac{2}{\epsilon}\right)}$ and $|E_{\geq 2}'''(\Ht)|\leq  \frac{\log n}{12\log\left(\frac{2}{\epsilon}\right)}$. 
\end{itemize}

Now if we apply \Cref{lem:lem_main_lemma_pleasant_tight_bounds} to $H,\Ho$ and $\Ht$ and use the fact that $|E_1(H)|=|E_1(\Ho)|+|E_1(\Ht)|$, $|E_{\geq 2}(H)|=|E_{\geq 2}(\Ho)|+|E_{\geq 2}(\Ht)|$ and $|E_{\geq 2}'''(H)|=|E_{\geq 2}'''(\Ho)|+|E_{\geq 2}'''(\Ht)|$, we get
\begin{align*}
\frac{\mathbb{E}\big[\overline{\mathbf{Y}}_H\big]}{\mathbb{E}\big[\overline{\mathbf{Y}}_{\Ho}\big]\cdot \mathbb{E}\big[\overline{\mathbf{Y}}_{\Ht}\big]}&\leq \frac{P_s^H\left(1+\frac{1}{n^{\frac{1}{16}}}\right)}{P_s^{\Ho}\left(1 -\frac{1}{n^{\frac{1}{16}}}\right)P_s^{\Ht}\left(1 -\frac{1}{n^{\frac{1}{16}}}\right)}\stackrel{(\ast)}{=} \frac{\left(1+\frac{1}{n^{\frac{1}{16}}}\right)}{\left(1 -\frac{1}{n^{\frac{1}{16}}}\right)\left(1 -\frac{1}{n^{\frac{1}{16}}}\right)}\stackrel{(\dagger)}{\leq} \left(1+\frac{4}{n^{\frac{1}{16}}}\right),
\end{align*}
where $(\ast)$ follows from \Cref{eq:eq_lem_E_Y_H_Product_pleasant_1} and $(\dagger)$ is true for $n$ is large enough.
\end{proof}

\begin{lemma}
\label{lem:lem_E_Y_H_Product_nbsaw_disjoint}
Let $H=\Ho\oplus \Ht$, where $\Ho,\Ht\in\nbsaw{s}{t}$ are such that $V(\Ho)\cap V(\Ht)=\varnothing$, and $t=K\log n$. If $A>\max\left\{1,200K,\frac{100}{K}\right\}$, then
\begin{itemize}
\item If $d_1^H(v)+d_{\geq 2}^H(v)\leq \Delta$ for every $v\in V(H)$, then for $n$ large enough, we have
$$\mathbb{E}\big[\overline{\mathbf{Y}}_H\big]\leq\left(1+\frac{4}{n^{\frac{1}{16}}}\right)\cdot\mathbb{E}\big[\overline{\mathbf{Y}}_{\Ho}\big]\cdot \mathbb{E}\big[\overline{\mathbf{Y}}_{\Ht}\big].$$
\item If there exists $v\in V(H)$ such that $d_1^H(v)+d_{\geq 2}^H(v)> \Delta$, then for $n$ large enough, we have
$$\big|\mathbb{E}\big[\overline{\mathbf{Y}}_H\big]\big|\leq \frac{4}{n^{\frac{1}{12}}}\cdot \left(\frac{\epsilon d}{2n}\right)^{|E_1(H)|}\cdot\left(\frac{d}{n}\right)^{|E_{\geq 2}(H)|}.$$
\end{itemize}
\end{lemma}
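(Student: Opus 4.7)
The proof splits into the two cases corresponding to the dichotomy in the statement, and in each case reduces to a lemma established earlier in the excerpt. The key preliminary observation is that each $\Ho, \Ht \in \nbsaw{s}{t}$ is an $(s,t)$-pleasant multigraph (by the concluding remark of \Cref{def:def_pleasant_multigraphs}), with its sole agreeable component being the cycle $E_1(H_{(i)})$, which is of type 1. Since the multiplicity-$(\geq 2)$ edges of a nice block self-avoiding walk form a forest attached to this cycle only through single vertices, they lie outside the induced subgraph on cycle vertices, so $E_{\geq 2}'''(\Ho) = E_{\geq 2}'''(\Ht) = \emptyset$. Combined with the hypothesis $V(\Ho) \cap V(\Ht) = \emptyset$, this gives $E_{\geq 2}'''(H) = \emptyset$, and in particular the constraint $|E_{\geq 2}'''(H)| \le \log n / (12 \log(2/\epsilon))$ is satisfied trivially.

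In Case 1, when $d_1^H(v) + d_{\geq 2}^H(v) \le \Delta$ for every $v \in V(H)$, all hypotheses of \Cref{lem:lem_E_Y_H_Product_pleasant} are met: $\Ho$ and $\Ht$ are vertex-disjoint $(s,t)$-pleasant multigraphs, the $E_{\geq 2}'''$ bound holds, the uniform degree constraint is given, and $A > \max\{1, 200K, 100/K\}$ is precisely the hypothesis required there. Applying that lemma directly yields $\E[\overline{\mathbf{Y}}_H] \le (1 + 4/n^{1/16}) \cdot \E[\overline{\mathbf{Y}}_{\Ho}] \cdot \E[\overline{\mathbf{Y}}_{\Ht}]$ for $n$ large enough, which is exactly the claim.

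In Case 2, when some vertex of $H$ has combined degree exceeding $\Delta$, we instead view $H = \Ho \oplus \Ht$ itself as an $(s, 2t)$-pleasant multigraph: the concatenation of the agreeable-component data of $\Ho$ and $\Ht$ satisfies all the conditions in \Cref{def:def_pleasant_multigraphs}, with the sizes and block count doubled. We then invoke \Cref{lem:lem_main_lemma_pleasant_tight_bounds} on $H$. Since $|E_{\geq 2}'''(H)| = 0$ and there is a vertex with $d_1^H(v) + d_{\geq 2}^H(v) > \Delta$, the second branch of that lemma---bound \cref{eq:eq_lem_main_lemma_pleasant_tight_bounds_2}---fires and produces exactly the stated inequality $|\E[\overline{\mathbf{Y}}_H]| \le (4/n^{1/12}) (\epsilon d/(2n))^{|E_1(H)|} (d/n)^{|E_{\geq 2}(H)|}$.

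The proof is essentially bookkeeping once the two prior lemmas are in hand; the only point demanding mild care is tracking how constants transform when passing to the $(s, 2t)$ viewpoint. The cycle-length lower bound $|E_1(H_{(i)})| \ge t/A$ reads $t/A = 2t/(2A)$ in the $(s, 2t)$-picture, which matches the definition with the constant doubled to $2A$; our hypothesis $A > \max\{1, 200K, 100/K\}$ is deliberately strong enough to accommodate this doubling (since it implies $2A > \max\{1, 100\cdot 2K, 100/(2K)\}$ as required by \Cref{lem:lem_main_lemma_pleasant_tight_bounds} applied with the doubled block count). Thus no genuine obstacle arises---the heavy lifting was already carried out in the proofs of \Cref{lem:lem_main_lemma_pleasant_tight_bounds} and \Cref{lem:lem_E_Y_H_Product_pleasant}.
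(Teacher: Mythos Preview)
Your proposal is correct and follows essentially the same approach as the paper's proof: both observe that $E_{\geq 2}'''(H)=\varnothing$ (since the agreeable components of nice block self-avoiding walks are type-1 cycles with no multiplicity-$\geq 2$ edges), then invoke \Cref{lem:lem_E_Y_H_Product_pleasant} for the bounded-degree case and \Cref{lem:lem_main_lemma_pleasant_tight_bounds} for the high-degree case. Your added discussion of the constant doubling when passing to the $(s,2t)$ viewpoint addresses a point the paper glosses over; the paper simply asserts that $H$ is $(s,2t)$-pleasant and applies the lemmas directly.
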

\begin{proof}
Since $\Ho,\Ht\in\nbsaw{s}{t}$ and $V(\Ho)\cap V(\Ht)=\varnothing$, we have $|E_{\geq 2}'''(H)|=\varnothing$. The lemma now follows immediately from \Cref{lem:lem_main_lemma_pleasant_tight_bounds} and \cref{lem:lem_E_Y_H_Product_pleasant}.
\end{proof}

Now we will study products of nice block self-avoiding-walks sharing a vertex.

\begin{lemma}
\label{lem:lem_E_Y_H_Product_nbsaw_sharing}
Let $H=\Ho\oplus \Ht$, where $\Ho,\Ht\in\nbsaw{s}{t}$ are such that $V(\Ho)\cap V(\Ht)\neq\varnothing$, $H$ is an $(s,2t)$-pleasant multigraph, and $t=K\log n$. If $A>\max\left\{1,200K,\frac{100}{K}\right\}$, then
\begin{itemize}
\item If $|E_{\geq 2}'''(H)|\leq\frac{\log n}{12\log\left(\frac{2}{\epsilon}\right)}$ and $d_1^H(v)+d_{\geq 2}^H(v)\leq \Delta$ for every $v\in V(H)$, then for $n$ large enough, we have
\begin{align*}
\mathbb{E}\big[\overline{\mathbf{Y}}_H\big]\leq\left(1+\frac{\eta}{2}\right)\cdot \left(\frac{4n\cdot e^{16d/\Delta}}{\epsilon^2 d(1-\eta)}\right)^{\left|E(\Ho)\cap E(\Ht)\right|}\cdot \mathbb{E}\big[\overline{\mathbf{Y}}_{\Ho}\big]\cdot \mathbb{E}\big[\overline{\mathbf{Y}}_{\Ht}\big],
\end{align*}
where $\eta$ is as in \cref{lem:lem_bound_prob_large_outside_deg}.
\item If $|E_{\geq 2}'''(H)|>\frac{\log n}{12\log\left(\frac{2}{\epsilon}\right)}$ or there exists $v\in V(H)$ such that $d_1^H(v)+d_{\geq 2}^H(v)> \Delta$, then for $n$ large enough, we have
$$\big|\mathbb{E}\big[\overline{\mathbf{Y}}_H\big]\big|\leq \frac{4}{n^{\frac{1}{12}}}\cdot \left(\frac{\epsilon d}{2n}\right)^{|E_1(H)|}\cdot\left(\frac{d}{n}\right)^{|E_{\geq 2}(H)|}.$$
\end{itemize}
\end{lemma}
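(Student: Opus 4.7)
The lemma splits into two cases. The second case (where $|E_{\geq 2}'''(H)| > \frac{\log n}{12 \log(2/\epsilon)}$ or some vertex has total degree exceeding $\Delta$) is immediate from the second clause of Lemma~\ref{lem:lem_main_lemma_pleasant_tight_bounds} applied to $H$, which is $(s,2t)$-pleasant by hypothesis. So the real work is the first case. My strategy is to apply the tight bounds of Lemma~\ref{lem:lem_main_lemma_pleasant_tight_bounds} to each of $H$, $H_{(1)}$, $H_{(2)}$ separately, and then compute the ratio. I use the upper bound for $\mathbb{E}[\overline{\mathbf{Y}}_H]$ and the lower bounds for $\mathbb{E}[\overline{\mathbf{Y}}_{H_{(1)}}]$ and $\mathbb{E}[\overline{\mathbf{Y}}_{H_{(2)}}]$. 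Each $H_{(i)} \in \nbsaw{s}{t}$ has $E_{\geq 2}'''(H_{(i)}) = \emptyset$ (its only cycle $E_1(H_{(i)})$ is type-1 agreeable), so the lower bound for it only involves $|E_1(H_{(i)})|$ and $|E_{\geq 2}(H_{(i)})|$, while the upper bound for $H$ involves $|E_1(H)|$, $|E_{\geq 2}(H)|$, and $|E_{\geq 2}'''(H)|$.

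\textbf{Ratio of safety probabilities.} Using Lemma~\ref{lem:lem_Prob_safety_pleasant_wb}, each $P_s^\bullet$ factors as a product over vertices of $P_s(\Delta - d_1(v) - d_{\geq 2}(v))$. For a vertex $v$ appearing only in $H_{(i)}$, the corresponding factor in $P_s^H$ and $P_s^{H_{(i)}}$ agree. For a shared vertex $v \in V(H_{(1)}) \cap V(H_{(2)})$, we have $d^H(v) \geq d^{H_{(i)}}(v)$, so monotonicity of $P_s$ gives $P_s^H/\bigl(P_s^{H_{(1)}} P_s^{H_{(2)}}\bigr) \leq \prod_{v \text{ shared}} P_s(\Delta - d^H(v))/\bigl(P_s(\Delta - d^{H_{(1)}}(v)) P_s(\Delta - d^{H_{(2)}}(v))\bigr)$. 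Using $P_s \geq e^{-4d}$ for the denominators when $d^{H_{(i)}}(v) \leq 3\Delta/4$, and $P_s \geq 1 - \eta/2 \geq 1 - \eta$ when $d^{H_{(i)}}(v) > 3\Delta/4$, this ratio per shared vertex is absorbed into a factor of the form $e^{16d/\Delta}/(1-\eta)$ per shared edge (each shared edge touches at most two shared vertices, and the number of shared vertices is at most $2\ell$ where $\ell = |E(H_{(1)}) \cap E(H_{(2)})|$).

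\textbf{Ratio of the $(\epsilon d/(2n))$ and $(d/n)$ factors.} The condition $|E_{\geq 2}'''(H)| \leq \log n/(12\log(2/\epsilon))$ together with the bounded-degree assumption forces all shared edges to have had multiplicity $1$ in both $H_{(1)}$ and $H_{(2)}$ (a higher shared multiplicity would inflate $E_{\geq 2}'''(H)$ or create a vertex of degree $>\Delta$). Thus each of the $\ell$ shared edges contributes $(\epsilon d/(2n))^2$ to the denominator (as a pair of multiplicity-$1$ edges) and $(d/n)$ to the numerator (as a single multiplicity-$2$ edge sitting on a common path of a type-$3$ agreeable component, hence in $E_{\geq 2}(H) \setminus E_{\geq 2}'''(H)$). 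The remaining edges of $E_1(H_{(i)})$ and $E_{\geq 2}(H_{(i)})$ cancel term-for-term with corresponding factors in the numerator for $H$. The per-edge ratio $(d/n)/(\epsilon d/(2n))^2 = 4n/(\epsilon^2 d)$, raised to the power $\ell$, combined with the safety factor $e^{16d/\Delta}/(1-\eta)$ per shared edge and the overall multiplicative error $(1 + 1/n^{1/16})^3$ absorbed into the $(1 + \eta/2)$ slack, yields precisely the claimed bound.

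\textbf{Main obstacle.} The delicate step is the combinatorial identification of $E_{\geq 2}'''(H)$ in terms of the shared edges: I must argue that under the bounded-$|E_{\geq 2}'''(H)|$ hypothesis, the agreeable components of $H$ are exactly two overlapping cycles of type $3$ (possibly together with unaffected type-$1$ pieces), whose common paths are the shared edges, while the non-shared multiplicity-$\geq 2$ edges of each $H_{(i)}$ remain outside the agreeable components (and thus appear in $E_{\geq 2}(H) \setminus E_{\geq 2}'''(H)$, matching their contribution in $H_{(i)}$ exactly). Once this bookkeeping is done, the algebraic comparison above gives the stated inequality; the factor $(1+\eta/2)$ arises only to absorb the subleading error and the $(1-\eta)^{-1}$ term from safe shared vertices.
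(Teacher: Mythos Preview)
Your overall strategy—apply Lemma~\ref{lem:lem_main_lemma_pleasant_tight_bounds} to each of $H$, $H_{(1)}$, $H_{(2)}$ and take the ratio—matches the paper, but two steps contain genuine gaps.

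\textbf{Edge-count step.} Your claim that the hypotheses force every shared edge to have multiplicity~$1$ in both $H_{(1)}$ and $H_{(2)}$ is unjustified, and the ``combinatorial identification of $E_{\geq 2}'''(H)$'' you flag as the main obstacle is actually unnecessary. The paper bypasses it: since $\epsilon\leq 2$, the factor $(\epsilon d/2n)^{|E_{\geq 2}'''(H)|}(d/n)^{-|E_{\geq 2}'''(H)|}$ in the upper bound for $\mathbb{E}[\overline{\mathbf{Y}}_H]$ is at most~$1$ and can simply be dropped. What remains is $(\epsilon d/2n)^{|E_1(H)|-|E_1(H_{(1)})|-|E_1(H_{(2)})|}(d/n)^{|E_{\geq 2}(H)|-|E_{\geq 2}(H_{(1)})|-|E_{\geq 2}(H_{(2)})|}$, and these two exponents are computed directly via set identities (covering all four intersection types $E_1\cap E_1$, $E_1\cap E_{\geq 2}$, etc.), then bounded uniformly by $(4n/\epsilon^2 d)^{|E(H_{(1)})\cap E(H_{(2)})|}$ using $\epsilon\leq 2$ again. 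No restriction on shared-edge multiplicities is needed.

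\textbf{Safety-probability step.} Your case split has the two $P_s$ bounds swapped: when $d^{H_{(i)}}(v)\leq 3\Delta/4$ one has $\Delta-d^{H_{(i)}}(v)\geq \Delta/4$, so the \emph{strong} bound $P_s\geq 1-\eta/2$ applies; the weak bound $e^{-4d}$ is for the opposite case. More importantly, your amortization ``at most $2\ell$ shared vertices'' is too crude to produce $e^{16d/\Delta}$ per edge. The paper's key observation is that if \emph{both} $d^{H_{(1)}}(v)>3\Delta/4$ and $d^{H_{(2)}}(v)>3\Delta/4$, then since $d^{G(H)}(v)\leq\Delta$, inclusion--exclusion forces $d_{E(H_{(1)})\cap E(H_{(2)})}(v)\geq\Delta/2$. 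Thus the bad factor $e^{4d}$ at such a vertex is spread over at least $\Delta/2$ incident shared edges, giving $e^{8d/\Delta}$ per incidence and hence $e^{16d/\Delta}$ per edge after counting both endpoints. Finally, the prefactor $(1+\eta/2)$ comes from the at most one shared vertex with \emph{no} incident shared edge (the pleasant hypothesis on $H$ allows at most one such vertex); your sketch does not account for it.
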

\begin{proof}
If $|E_{\geq 2}'''(H)|>\frac{\log n}{12\log\left(\frac{2}{\epsilon}\right)}$ or there exists $v\in V(E_1(H))$ such that $d_1^H(v)+d_{\geq 2}^H(v)> \Delta$, then \Cref{lem:lem_main_lemma_pleasant_tight_bounds} implies that
\begin{align*}
\mathbb{E}\big[\overline{\mathbf{Y}}_H\big]&\leq \frac{4}{n^{\frac{1}{12}}}\cdot\left(\frac{\epsilon d}{2n}\right)^{|E_1(H)|}\left(\frac{d}{n}\right)^{|E_{\geq 2}(H)|}.
\end{align*}

Now assume that $|E_{\geq 2}'''(H)|\leq \frac{\log n}{12\log\left(\frac{2}{\epsilon}\right)}$ and $d_1^H(v)+d_{\geq 2}^H(v)\leq \Delta$ for every $v\in V(H)$. If we apply \Cref{lem:lem_main_lemma_pleasant_tight_bounds} to $H,\Ho$ and $\Ht$, and using the fact that $E'''_{\geq2}(\Ho)=E'''_{\geq2}(\Ht)=\varnothing$, we get
\begin{align*}
\frac{\mathbb{E}\big[\overline{\mathbf{Y}}_H\big]}{\mathbb{E}\big[\overline{\mathbf{Y}}_{\Ho}\big]\cdot \mathbb{E}\big[\overline{\mathbf{Y}}_{\Ht}\big]}&\leq \frac{P_s^H\left(1+\frac{1}{n^{\frac{1}{16}}}\right)}{P_s^{\Ho}\left(1 -\frac{1}{n^{\frac{1}{16}}}\right)P_s^{\Ht}\left(1 -\frac{1}{n^{\frac{1}{16}}}\right)}\left(\frac{\epsilon d}{2n}\right)^{|E_1(H)|+|E_{\geq 2}'''(H)|-|E_1(\Ho)|-|E_1(\Ht)|}\\
&\quad\quad\quad\quad\quad\quad\quad\quad\quad\quad\quad\quad\quad\times\left(\frac{d}{n}\right)^{|E_{\geq 2}(H)|-|E_{\geq 2}'''(H)|-|E_{\geq 2}(\Ho)|-|E_{\geq 2}(\Ht)|}.
\end{align*}

Now since $\epsilon\leq 2$, if $n$ is large enough, we get
\begin{equation}
\label{eq:eq_lem_E_Y_H_Product_nbsaw_sharing_1}
\begin{aligned}
\resizebox{0.95\textwidth}{!}{$\displaystyle
\frac{\mathbb{E}\big[\overline{\mathbf{Y}}_H\big]}{\mathbb{E}\big[\overline{\mathbf{Y}}_{\Ho}\big]\cdot \mathbb{E}\big[\overline{\mathbf{Y}}_{\Ht}\big]}\leq \left(1+\frac{4}{n^{\frac{1}{16}}}\right)\cdot \frac{P_s^H}{P_s^{\Ho}P_s^{\Ht}}\cdot\left(\frac{\epsilon d}{2n}\right)^{|E_1(H)|-|E_1(\Ho)|-|E_1(\Ht)|}\left(\frac{d}{n}\right)^{|E_{\geq 2}(H)|-|E_{\geq 2}(\Ho)|-|E_{\geq 2}(\Ht)|}$}.
\end{aligned}
\end{equation}

Now since $H=\Ho\oplus \Ht$, we have
$$E_1(H)\subseteq E_1(\Ho)\cup E_1(\Ht)\quad \text{and}\quad E_{\geq 2}(\Ho)\cup E_{\geq 2}(\Ht)\subseteq E_{\geq 2}(H).$$
Furthermore,
\begin{align*}
&\left(E_1(\Ho)\cup E_1(\Ht)\right)\setminus E_1(H)\\
&\quad\quad\quad=\left(E_1(\Ho)\cap E_1(\Ht)\right)\cup \left(E_1(\Ho)\cap E_{\geq 2}(\Ht)\right)\cup \left(E_{\geq 2}(\Ho)\cap E_1(\Ht)\right),
\end{align*}
which implies that,
\begin{equation}
\label{eq:eq_lem_E_Y_H_Product_nbsaw_sharing_2}
\begin{aligned}
|E_1(\Ho)|&+|E_1(\Ht)|-|E_1(H)|\\
&=|E_1(\Ho) \cap E_1(\Ht)| + |E_1(\Ho)\cap E_1(\Ht)|-|E_1(H)| \\
&=2\left|E_1(\Ho)\cap E_1(\Ht)\right|+ \left|E_1(\Ho)\cap E_{\geq 2}(\Ht)\right|+\left|E_{\geq 2}(\Ho)\cap E_1(\Ht)\right|.
\end{aligned}
\end{equation}
On the other hand,
\begin{align*}
E_{\geq 2}(H)\setminus\left(E_{\geq 2}(\Ho)\cup E_{\geq 2}(\Ht)\right)\;=\;\left(E_1(\Ho)\cap E_1(\Ht)\right),
\end{align*}
which implies that
\begin{equation}
\label{eq:eq_lem_E_Y_H_Product_nbsaw_sharing_3}
\begin{aligned}
|E_{\geq 2}(H)|&-|E_{\geq 2}(\Ho)|-|E_{\geq 2}(\Ht)|\\
&= |E_{\geq 2}(H)|-|E_{\geq 2}(\Ho)\cup E_{\geq 2}(\Ht)|-|E_{\geq 2}(\Ho)\cap E_{\geq 2}(\Ht)|\\
&= |E_1(\Ho)\cap E_1(\Ht)|-|E_{\geq 2}(\Ho)\cap E_{\geq 2}(\Ht)|
\end{aligned}
\end{equation}

By combining \cref{eq:eq_lem_E_Y_H_Product_nbsaw_sharing_2} and \cref{eq:eq_lem_E_Y_H_Product_nbsaw_sharing_3}, we get

\begin{align*}
&\left(\frac{\epsilon d}{2n}\right)^{|E_1(H)|-|E_1(\Ho)|-|E_1(\Ht)|}\left(\frac{d}{n}\right)^{|E_{\geq 2}(H)|-|E_{\geq 2}(\Ho)|-|E_{\geq 2}(\Ht)|}\\
&\leq \resizebox{0.9\textwidth}{!}{$\displaystyle\left(\frac{\epsilon d}{2n}\right)^{-2\left|E_1(\Ho)\cap E_1(\Ht)\right|- \left|E_1(\Ho)\cap E_{\geq 2}(\Ht)\right|-\left|E_{\geq 2}(\Ho)\cap E_1(\Ht)\right|}\cdot\left(\frac{d}{n}\right)^{|E_1(\Ho)\cap E_1(\Ht)|-|E_{\geq 2}(\Ho)\cap E_{\geq 2}(\Ht)|}$}\\
&= \resizebox{0.9\textwidth}{!}{$\displaystyle\left(\frac{\epsilon^2 d}{4n}\right)^{-\left|E_1(\Ho)\cap E_1(\Ht)\right|}\cdot \left(\frac{\epsilon d}{2n}\right)^{- \left|E_1(\Ho)\cap E_{\geq 2}(\Ht)\right|-\left|E_{\geq 2}(\Ho)\cap E_1(\Ht)\right|}\cdot \left(\frac{d}{n}\right)^{-|E_{\geq 2}(\Ho)\cap E_{\geq 2}(\Ht)|}$}\\
&\stackrel{(\ast)}{\leq} \left(\frac{\epsilon^2 d}{4n}\right)^{-\left|E_1(\Ho)\cap E_1(\Ht)\right|- \left|E_1(\Ho)\cap E_{\geq 2}(\Ht)\right|-\left|E_{\geq 2}(\Ho)\cap E_1(\Ht)\right|-|E_{\geq 2}(\Ho)\cap E_{\geq 2}(\Ht)|}\\
&= \left(\frac{\epsilon^2 d}{4n}\right)^{-\left|E(\Ho)\cap E(\Ht)\right|}.
\end{align*}
where the last inequality follows from the fact that $\epsilon\leq 2$. By combining this with \cref{eq:eq_lem_E_Y_H_Product_nbsaw_sharing_1}, we get
\begin{equation}
\label{eq:eq_lem_E_Y_H_Product_nbsaw_sharing_4}
\frac{\mathbb{E}\big[\overline{\mathbf{Y}}_H\big]}{\mathbb{E}\big[\overline{\mathbf{Y}}_{\Ho}\big]\cdot \mathbb{E}\big[\overline{\mathbf{Y}}_{\Ht}\big]}\leq \left(1+\frac{4}{n^{\frac{1}{16}}}\right)\cdot \frac{P_s^H}{P_s^{\Ho}P_s^{\Ht}}\cdot\left(\frac{4n}{\epsilon^2 d}\right)^{\left|E(\Ho)\cap E(\Ht)\right|}.
\end{equation}

Now from \cref{lem:lem_Prob_safety_pleasant_wb}, we have
\begin{align*}
P_s^H&=\prod_{v\in V(H)}P_s\big(\Delta-d_1^H(v)-d_{\geq 2}^H(v)\big)\\
&=\left(\prod_{v\in V(\Ho)\setminus V(\Ht)}P_s\big(\Delta-d_1^H(v)-d_{\geq 2}^H(v)\big)\right)\cdot \left(\prod_{v\in V(\Ht)\setminus V(\Ho)}P_s\big(\Delta-d_1^H(v)-d_{\geq 2}^H(v)\big)\right)\\
&\quad\quad\quad\quad\quad\quad\quad\quad\quad\quad\quad\quad\quad\quad\quad\quad\quad\quad\times \left(\prod_{v\in V(\Ho)\cap V(\Ht)}P_s\big(\Delta-d_1^H(v)-d_{\geq 2}^H(v)\big)\right).
\end{align*}

Now for every $v\in V(H)$, we have
$$d_1^H(v)+d_{\geq 2}^H(v)=d_{G(H)}(v)\geq d_{G(\Ho)}(v)=d_1^{\Ho}(v)+d_{\geq 2}^{\Ho}(v).$$
Similarly, we gave
$$d_1^H(v)+d_{\geq 2}^H(v)\geq d_1^{\Ht}(v)+d_{\geq 2}^{\Ht}(v).$$

From \cref{lem:lem_Prob_safety_pleasant_wb}, we know that the function $P_s$ is non-decreasing. Therefore,

\begin{align*}
P_s^H&\leq\left(\prod_{v\in V(\Ho)\setminus V(\Ht)}P_s\big(\Delta-d_1^{\Ho}(v)-d_{\geq 2}^{\Ho}\big)\right)\cdot \left(\prod_{v\in V(\Ht)\setminus V(\Ho)}P_s\big(\Delta-d_1^{\Ht}(v)-d_{\geq 2}^{\Ht}(v)\big)\right)\\
&\quad\quad\quad\quad\quad\times \left(\prod_{v\in V(\Ho)\cap V(\Ht)}\min\left\{P_s\big(\Delta-d_1^{\Ho}(v)-d_{\geq 2}^{\Ho}\big),P_s\big(\Delta-d_1^{\Ht}(v)-d_{\geq 2}^{\Ht}(v)\big)\right\}\right)\\
&=\frac{\resizebox{0.7\textwidth}{!}{$\displaystyle\left(\prod_{v\in V(\Ho)}P_s\big(\Delta-d_1^{\Ho}(v)-d_{\geq 2}^{\Ho}\big)\right)\cdot \left(\prod_{v\in V(\Ht)}P_s\big(\Delta-d_1^{\Ht}(v)-d_{\geq 2}^{\Ht}(v)\big)\right)$}}{\resizebox{0.7\textwidth}{!}{$\displaystyle\prod_{v\in V(\Ho)\cap V(\Ht)}\max\left\{P_s\big(\Delta-d_1^{\Ho}(v)-d_{\geq 2}^{\Ho}\big),P_s\big(\Delta-d_1^{\Ht}(v)-d_{\geq 2}^{\Ht}(v)\big)\right\}$}}\\
&=\frac{P_s^{\Ho} P_s^{\Ht}}{\resizebox{0.7\textwidth}{!}{$\displaystyle\prod_{v\in V(\Ho)\cap V(\Ht)}\max\left\{P_s\big(\Delta-d_1^{\Ho}(v)-d_{\geq 2}^{\Ho}\big),P_s\big(\Delta-d_1^{\Ht}(v)-d_{\geq 2}^{\Ht}(v)\big)\right\}$}},
\end{align*}
where the last equality follows from applying \cref{lem:lem_Prob_safety_pleasant_wb} to $\Ho$ and $\Ht$. Therefore,

\begin{equation}
\label{eq:eq_lem_E_Y_H_Product_nbsaw_sharing_5}
\frac{P_s^H}{P_s^{\Ho}P_s^{\Ht}}\leq \prod_{v\in V(\Ho)\cap V(\Ht)}F_v,
\end{equation}
where
$$F_v=\min\left\{\frac{1}{P_s\big(\Delta-d_1^{\Ho}(v)-d_{\geq 2}^{\Ho}\big)},\frac{1}{P_s\big(\Delta-d_1^{\Ht}(v)-d_{\geq 2}^{\Ht}(v)\big)}\right\}.$$

Now for every vertex $v\in V(\Ho)\cap V(\Ht)$, we have:
\begin{itemize}
\item If $d_1^{\Ho}(v)+d_{\geq 2}^{\Ho}\big)\leq \frac{3\Delta}{4}$ or $d_1^{\Ht}(v)+d_{\geq 2}^{\Ht}\big)\leq \frac{3\Delta}{4}$, then \cref{lem:lem_Prob_safety_pleasant_wb} implies that
\begin{equation}
\label{eq:eq_lem_E_Y_H_Product_nbsaw_sharing_6}
F_v\leq\frac{1}{1-\frac{\eta}{2}}.
\end{equation}
\item If $d_1^{\Ho}(v)+d_{\geq 2}^{\Ho}\big)> \frac{3\Delta}{4}$ and $d_1^{\Ht}(v)+d_{\geq 2}^{\Ht}\big)> \frac{3\Delta}{4}$, then \cref{lem:lem_Prob_safety_pleasant_wb} implies that
\begin{equation}
\label{eq:eq_lem_E_Y_H_Product_nbsaw_sharing_7}
F_v\leq e^{4d}.
\end{equation}
Now for every set $\mathsf{E}$ of edges, let $d_{\mathsf{E}}(v)$ be the number of edges in $\mathsf{E}$ which are incident to $v$. We have:
\begin{align*}
d_{E(\Ho)\cup E(\Ht)}(v)=d_{G(H)}(v) = d_1^{H}(v)+d_{\geq 2}^{H}(v)\leq \Delta,
\end{align*}
\begin{align*}
d_{E(\Ho)}(v)=d_{G(\Ho)}(v) = d_1^{\Ho}(v)+d_{\geq 2}^{\Ho}(v)> \frac{3\Delta}{4},
\end{align*}
and
\begin{align*}
d_{E(\Ht)}(v)=d_{G(\Ht)}(v) = d_1^{\Ht}(v)+d_{\geq 2}^{\Ht}(v)> \frac{3\Delta}{4}.
\end{align*}
Therefore, if $d_1^{\Ho}(v)+d_{\geq 2}^{\Ho}\big)> \frac{3\Delta}{4}$ and $d_1^{\Ht}(v)+d_{\geq 2}^{\Ht}\big)> \frac{3\Delta}{4}$, we have
\begin{equation}
\label{eq:eq_lem_E_Y_H_Product_nbsaw_sharing_8}
d_{E(\Ho)\cap E(\Ht)}(v)=d_{E(\Ho)}(v)+d_{E(\Ht)}(v)-d_{E(\Ho)\cup E(\Ht)}(v)\geq\frac{\Delta}{2}.
\end{equation}
\end{itemize}
Now define the sets
$$V_{\Ho,\Ht}^{d=0}=\Big\{v\in V(\Ho)\cap V(\Ht):\; d_{E(\Ho)\cap E(\Ht)}(v)=0\Big\}$$
and
$$V_{\Ho,\Ht}^{d>0}=\Big\{v\in V(\Ho)\cap V(\Ht):\; d_{E(\Ho)\cap E(\Ht)}(v)>0\Big\}.$$

Clearly, $\left\{V_{\Ho,\Ht}^{d=0},V_{\Ho,\Ht}^{d>0}\right\}$ is a partition of $V(\Ho)\cap V(\Ht)$. Therefore, from \cref{eq:eq_lem_E_Y_H_Product_nbsaw_sharing_5} and  \cref{eq:eq_lem_E_Y_H_Product_nbsaw_sharing_6}, we get

\begin{equation}
\label{eq:eq_lem_E_Y_H_Product_nbsaw_sharing_9}
\begin{aligned}
\frac{P_s^H}{P_s^{\Ho}P_s^{\Ht}}&\leq \Paren{\prod_{v\in V_{\Ho,\Ht}^{d=0}}F_v}\cdot \Paren{\prod_{v\in V_{\Ho,\Ht}^{d>0}}F_v}\leq \frac{1}{\left(1-\frac{\eta}{2}\right)^{\big|V_{\Ho,\Ht}^{d=0}\big|}}\cdot \prod_{v\in V_{\Ho,\Ht}^{d>0}}F_v\\
&=\frac{1}{\left(1-\frac{\eta}{2}\right)^{\big|V_{\Ho,\Ht}^{d=0}\big|}}\cdot \prod_{v\in V_{\Ho,\Ht}^{d>0}}\tilde{F}^{d_{E(\Ho)\cap E(\Ht)}(v)},
\end{aligned}
\end{equation}
where
$$\tilde{F}=\max_{v\in V_{\Ho,\Ht}^{d>0}}F_v^{1/d_{E(\Ho)\cap E(\Ht)}(v)}.$$
Now from \cref{eq:eq_lem_E_Y_H_Product_nbsaw_sharing_6}, \cref{eq:eq_lem_E_Y_H_Product_nbsaw_sharing_7} and \cref{eq:eq_lem_E_Y_H_Product_nbsaw_sharing_8}, we get
\begin{align*}
\tilde{F}\leq \max\left\{\frac{1}{1-\frac{\eta}{2}},e^{\frac{4d}{\Delta/2}}\right\}\leq \frac{e^{8d/\Delta}}{1-\frac{\eta}{2}}.
\end{align*}
On the other hand, since $$V_{\Ho,\Ht}^{d>0}=\bigcup_{uv\in E(\Ho)\cap E(\Ht)}\{u,v\},$$ it is easy to see that
\begin{align*}
\prod_{v\in V_{\Ho,\Ht}^{d>0}}\tilde{F}^{d_{E(\Ho)\cap E(\Ht)}(v)}&=\tilde{F}^{2|E(\Ho)\cap E(\Ht)|}\\
&\leq \Paren{ \frac{e^{8d/\Delta}}{1-\frac{\eta}{2}}}^{2|E(\Ho)\cap E(\Ht)|}\leq \Paren{ \frac{e^{16d/\Delta}}{1-\eta}}^{|E(\Ho)\cap E(\Ht)|}.
\end{align*}

By combining this with \cref{eq:eq_lem_E_Y_H_Product_nbsaw_sharing_4} and \cref{eq:eq_lem_E_Y_H_Product_nbsaw_sharing_9}, we get

\begin{equation}
\label{eq:eq_lem_E_Y_H_Product_nbsaw_sharing_10}
\frac{\mathbb{E}\big[\overline{\mathbf{Y}}_H\big]}{\mathbb{E}\big[\overline{\mathbf{Y}}_{\Ho}\big]\cdot \mathbb{E}\big[\overline{\mathbf{Y}}_{\Ht}\big]}\leq \frac{\left(1+\frac{4}{n^{\frac{1}{16}}}\right)}{\left(1-\frac{\eta}{2}\right)^{\big|V_{\Ho,\Ht}^{d=0}\big|}}\cdot \left(\frac{4n\cdot e^{16d/\Delta}}{\epsilon^2 d(1-\eta)}\right)^{\left|E(\Ho)\cap E(\Ht)\right|}.
\end{equation}

Now since $\Ho,\Ht\in \nbsaw{s}{t}$ and $H=\Ho\oplus \Ht$ is $(s,t)$-pleasant, we can have at most one vertex in $V_{\Ho,\Ht}^{d=0}$. Therefore,
\begin{align*}
\frac{\mathbb{E}\big[\overline{\mathbf{Y}}_H\big]}{\mathbb{E}\big[\overline{\mathbf{Y}}_{\Ho}\big]\cdot \mathbb{E}\big[\overline{\mathbf{Y}}_{\Ht}\big]}&\leq \frac{\left(1+\frac{4}{n^{\frac{1}{16}}}\right)}{1-\frac{\eta}{2}}\cdot \left(\frac{4n\cdot e^{16d/\Delta}}{\epsilon^2 d(1-\eta)}\right)^{\left|E(\Ho)\cap E(\Ht)\right|}\\
&\leq \left(1+\frac{\eta}{2}\right)\cdot \left(\frac{4n\cdot e^{16d/\Delta}}{\epsilon^2 d(1-\eta)}\right)^{\left|E(\Ho)\cap E(\Ht)\right|},
\end{align*}
where the last inequality is true for $n$ large enough.
%In order to see this, assume towards a contradiction that $V_{\Ho,\Ht}^{d=0}$ contains at least 2 vertices $u\neq v$.
\end{proof}

\begin{lemma}
\label{lem:lem_E_Y_H_Product_nbsaw_sharing_concentration}
Let $H=H'\oplus H''$, where $H'=\Ho'\oplus \Ht'$ and $H''=\Ho''\oplus \Ht''$ for some $\Ho',\Ht',\Ho',\Ht'\in\nbsaw{s}{t}$, where $t=K\log n$. Assume that
\begin{itemize}
\item $V(H')\cap V(H'')=\varnothing$, $V(\Ho')\cap V(\Ht')\neq\varnothing$ and $V(\Ho'')\cap V(\Ht'')\neq\varnothing$.
\item $H'$ and $H''$ are $(s,2t)$-pleasant multigraphs.
\end{itemize}
If $A>\max\left\{1,400K,\frac{100}{K}\right\}$, then
\begin{itemize}
\item If $|E_{\geq 2}'''(H)|\leq\frac{\log n}{12\log\left(\frac{2}{\epsilon}\right)}$ and $d_1^H(v)+d_{\geq 2}^H(v)\leq \Delta$ for every $v\in V(H)$, then for $n$ large enough, we have
$$\mathbb{E}\big[\overline{\mathbf{Y}}_H\big]\leq\left(1+\frac{4}{n^{\frac{1}{16}}}\right)\cdot\mathbb{E}\big[\overline{\mathbf{Y}}_{\Ho}\big]\cdot \mathbb{E}\big[\overline{\mathbf{Y}}_{\Ht}\big].$$
\item If $|E_{\geq 2}'''(H)|>\frac{\log n}{12\log\left(\frac{2}{\epsilon}\right)}$ or there exists $v\in V(H)$ such that $d_1^H(v)+d_{\geq 2}^H(v)> \Delta$, then for $n$ large enough, we have
$$\big|\mathbb{E}\big[\overline{\mathbf{Y}}_H\big]\big|\leq \frac{4}{n^{\frac{1}{12}}}\cdot \left(\frac{\epsilon d}{2n}\right)^{|E_1(H)|}\cdot\left(\frac{d}{n}\right)^{|E_{\geq 2}(H)|}.$$
\end{itemize}
\end{lemma}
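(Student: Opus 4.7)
The plan is to obtain this lemma as a direct corollary of the two foundational results already established for pleasant multigraphs, namely Lemma \ref{lem:lem_E_Y_H_Product_pleasant} (disjoint factorization) and Lemma \ref{lem:lem_main_lemma_pleasant_tight_bounds} (tight two-sided bounds). The key observation is that the hypotheses bundle $H = H' \oplus H''$ into a vertex-disjoint union of two $(s,2t)$-pleasant pieces, which is precisely the input format of Lemma \ref{lem:lem_E_Y_H_Product_pleasant}. (I read the conclusion $\mathbb{E}[\overline{\mathbf{Y}}_{\Ho}]\cdot\mathbb{E}[\overline{\mathbf{Y}}_{\Ht}]$ in the statement as a typographical slip for $\mathbb{E}[\overline{\mathbf{Y}}_{H'}]\cdot\mathbb{E}[\overline{\mathbf{Y}}_{H''}]$, which is the only reading consistent with the lemma's name "concentration" and with $H'$, $H''$ being the actual disjoint factors.)

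For the first case, I would first check that the decomposition is compatible with the hypotheses. Because $V(H')\cap V(H'')=\varnothing$ and the agreeable components of $H$ are just the disjoint union of those of $H'$ and $H''$, we have $E_{\geq 2}'''(H) = E_{\geq 2}'''(H')\sqcup E_{\geq 2}'''(H'')$, so $|E_{\geq 2}'''(H)|\leq \frac{\log n}{12\log(2/\epsilon)}$ is inherited verbatim, and the per-vertex degree bound $d_1^H(v)+d_{\geq 2}^H(v)\leq \Delta$ transfers to every $v\in V(H')\cup V(H'')$. I would then invoke Lemma \ref{lem:lem_E_Y_H_Product_pleasant} with the two inputs being $H'$ and $H''$ and with the role of $K$ taken by $2K$ (since $H'$ and $H''$ are $(s,2K\log n)$-pleasant); the hypothesis $A>\max\{1,400K,100/K\}$ is exactly the rescaled version of the hypothesis $A>\max\{1,200\cdot 2K,100/(2K)\}$ required by that lemma. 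This yields $\mathbb{E}[\overline{\mathbf{Y}}_H]\leq (1+4/n^{1/16})\,\mathbb{E}[\overline{\mathbf{Y}}_{H'}]\,\mathbb{E}[\overline{\mathbf{Y}}_{H''}]$ directly.

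For the second case, I would view $H=H'\oplus H''$ itself as an $(s,4t)$-pleasant multigraph: its agreeable components are the disjoint union of those of $H'$ and $H''$, it has $\mathcal{L}_{\geq 2}(H)=\varnothing$ and at most $4st$ vertices/multi-edges, and the cycles of $H$ are exactly the cycles of $H'$ or of $H''$, which all have at least $2t/A\geq (4t)/(2A)$ multiplicity-$1$ edges (well within the cycle-length requirement, rescaled). Once $H$ is recognized as $(s,4t)$-pleasant, I apply Lemma \ref{lem:lem_main_lemma_pleasant_tight_bounds} directly to $H$ with $K$ replaced by $4K$: the hypothesis $A>\max\{1,400K,100/K\}$ is stronger than the required $A>\max\{1,100\cdot 4K,100/(4K)\}$, and either of the two dichotomy conditions (large $|E_{\geq 2}'''(H)|$ or a vertex of degree $>\Delta$) triggers the estimate $|\mathbb{E}[\overline{\mathbf{Y}}_H]|\leq (4/n^{1/12})\,(\epsilon d/2n)^{|E_1(H)|}\,(d/n)^{|E_{\geq 2}(H)|}$ verbatim.

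The only genuine friction in this plan is the pedantic cycle-length check when promoting $H$ to an $(s,4t)$-pleasant multigraph: the definition literally demands "every cycle has at least $t/A$ multiplicity-$1$ edges" with $t$ being the pleasantness parameter, so fitting $H$ with parameter $4t$ requires cycles of length $\geq 4t/A$ whereas we inherit only $\geq 2t/A$. I expect this to be cosmetic and absorbed into the slack on $A$ (the proofs use only that $|E_1(C)|$ grows with a positive constant times $t$, which is all we need), and if needed I would factor out a mild relaxation of the pleasant definition to make the argument clean. Apart from this bookkeeping, the entire proof is a two-line application of previously proved lemmas.
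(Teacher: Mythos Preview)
Your proposal is correct and matches the paper's own proof, which consists of the single line ``The lemma follows immediately from \Cref{lem:lem_main_lemma_pleasant_tight_bounds} and \cref{lem:lem_E_Y_H_Product_pleasant}.'' You actually go further than the paper by correctly diagnosing the $\Ho,\Ht$ typo (it should indeed be $H',H''$) and by flagging the cycle-length bookkeeping when recasting $H$ as $(s,4t)$-pleasant; the paper simply elides this, and as you note it is absorbed by the slack in $A$.
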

\begin{proof}
The lemma follows immediately from \Cref{lem:lem_main_lemma_pleasant_tight_bounds} and \cref{lem:lem_E_Y_H_Product_pleasant}.
\end{proof}

\section{Bounds for the centered matrix}\label{sec:appendix-upper-bound-centered}

In this section, we will study $\left|\mathbb{E}\left[\Tr\left(\big(Q^{(s)}\big(\overline{\mathbf{Y}}\big)-\mathbf{x}\transpose{\mathbf{x}}\big)^t\right)\right]\right|$.

\begin{definition}
For every $H\in\bsaw{s}{t}$, we denote the $s$ self-avoiding-walks that form $H$ as $W_1(H),\ldots,W_t(H)$, and we denote the set $\big\{W_1(H),\ldots,W_t(H)\big\}$ as $\mathcal{W}(H)$.
\end{definition}

\begin{definition}
For every self-avoiding-walk $W$, define $$\mathbf{x}_W=\prod_{uv\in W}\mathbf{x}_u\mathbf{x}_v.$$
It is easy to see that $\mathbf{x}_W=\mathbf{x}_i\mathbf{x}_j$, where $i$ and $j$ are the end-vertices of $W$. In other words, for every $i,j\in[n]$ and every $W\in\saw{ij}{s}$, we have $\mathbf{x}_W=\mathbf{x}_i\mathbf{x}_j$.
\end{definition}

Let us now analyze $\Tr\left(\big(Q^{(s)}\big(\overline{\mathbf{Y}}\big)-\mathbf{x}\transpose{\mathbf{x}}\big)^t\right)$:
\begin{align*}
\Tr\left(\big(Q^{(s)}\big(\overline{\mathbf{Y}}\big)-\mathbf{x}\transpose{\mathbf{x}}\big)^t\right)
&=\sum_{\substack{i_1,\ldots,i_{t+1}\in[n]:\\i_{t+1}=i_1}}\prod_{l\in[t]} \left[\left(\frac{1}{|\saw{i_li_{l+1}}{s}|}\left(\frac{2n}{\epsilon\cdot d}\right)^s\sum_{W\in\saw{i_li_l+1}{s}} \overline{\mathbf{Y}}_W\right)-\mathbf{x}_{i_li_{l+1}}\right]
\\
&=\sum_{\substack{i_1,\ldots,i_{t+1}\in[n]:\\i_{t+1}=i_1}}\prod_{l\in[t]} \left[\frac{1}{|\saw{i_li_{l+1}}{s}|}\left(\frac{2n}{\epsilon\cdot d}\right)^s\cdot \sum_{W\in\saw{i_li_l+1}{s}} \left(\overline{\mathbf{Y}}_W-\left(\frac{\epsilon d}{2n}\right)^s\mathbf{x}_{i_li_{l+1}}\right)\right]
\\
&=\left[\frac{(n-s+1)!}{n!}\left(\frac{2n}{\epsilon\cdot d}\right)^s\right]^t\cdot\sum_{\substack{i_1,\ldots,i_{t+1}\in[n]:\\i_{t+1}=i_1}}\prod_{l\in[t]} \left[\sum_{W\in\saw{i_li_l+1}{s}} \left(\overline{\mathbf{Y}}_W-\left(\frac{\epsilon d}{2n}\right)^s\mathbf{x}_W\right)\right].
\end{align*}
Therefore,
\begin{equation}
\label{eq:eq_trace_centered}
\begin{aligned}
\Tr\left(\big(Q^{(s)}\big(\overline{\mathbf{Y}}\big)-\mathbf{x}\transpose{\mathbf{x}}\big)^t\right)&=(1\pm o(1))\cdot n^t\left(\frac{2}{\epsilon\cdot d}\right)^{st}\cdot\sum_{\substack{i_1,\ldots,i_{t+1}\in[n]:\\i_{t+1}=i_1}}\sum_{\substack{W_1\in\saw{i_1i_2}{s},\\W_2\in\saw{i_2i_3}{s},\\\ldots,\\W_t\in\saw{i_ti_{t+1}}{s}}}\prod_{l\in[t]}\left[\overline{\mathbf{Y}}_W-\left(\frac{\epsilon d}{2n}\right)^s\mathbf{x}_W\right]
\\
&=(1\pm o(1))\cdot n^t\left(\frac{2}{\epsilon\cdot d}\right)^{st}\cdot\sum_{H\in\bsaw{s}{t}}\prod_{l\in[t]}\left[\overline{\mathbf{Y}}_{W_l(H)}-\left(\frac{\epsilon d}{2n}\right)^s\mathbf{x}_{W_l(H)}\right]
\\
&=(1\pm o(1))\cdot n^t\left(\frac{2}{\epsilon\cdot d}\right)^{st}\cdot\sum_{H\in\bsaw{s}{t}}\hat{\mathbf{Y}}_{H},
\\
\end{aligned}
\end{equation}
where
\begin{equation}
\label{eq:eq_def_X_hat}
\hat{\mathbf{Y}}_{H}:=\prod_{W\in\mathcal{W}(H)}\left[\overline{\mathbf{Y}}_{W}-\left(\frac{\epsilon d}{2n}\right)^s\mathbf{x}_{W}\right].
\end{equation}

If we compare \cref{eq:eq_trace_centered} with the non-centered case, we can see that $\Tr\left(\big(Q^{(s)}\big(\overline{\mathbf{Y}}\big)-\mathbf{x}\transpose{\mathbf{x}}\big)^t\right)$ has the same expression as $\Tr\left(Q^{(s)}\big(\overline{\mathbf{Y}}\big)^t\right)$, except that $\overline{\mathbf{Y}}_{H}$ is replaced with $\hat{\mathbf{Y}}_{H}$. The next section is dedicated for the proof of an upper bound on $\big|\mathbb{E}\big[\hat{\mathbf{Y}}_{H}\big|\mathbf{x}\big]\big|$ for every $H\in\bsaw{s}{t}$. This will allow us to prove an upper bound on $\left|\mathbb{E}\left[\Tr\left(\big(Q^{(s)}\big(\overline{\mathbf{Y}}\big)-\mathbf{x}\transpose{\mathbf{x}}\big)^t\right)\right]\right|$.

\subsection{An upper bound for every block self-avoiding-walk}

Before proving the upper bound on $\big|\mathbb{E}\big[\hat{\mathbf{Y}}_H\big|\mathbf{x}\big]\big|$, we will first informally analyze the same quantity but for the non-truncated case. The main reason for doing so is that the non-truncated case is much simpler, and the analysis of the non-truncated case contains many of the elements of the proof of the truncated case. Therefore, understanding the non-truncated case will be helpful later in understanding the much more complicated truncated case.

\subsubsection{Warm-up with the non-truncated case}

We would like to upper bound the following quantity:
$$\left|\mathbb{E}\left[\prod_{W\in\mathcal{W}(H)}\left(\mathbf{Y}_{W}-\left(\frac{\epsilon d}{2n}\right)^s\mathbf{x}_{W}\right)\middle|\mathbf{x}\right]\right|.$$

The following lemma shows that the expectation of every multiplicand in the above expression is exactly zero. 

\begin{lemma}
\label{lem:lem_expectation_walk_multiplicity_1}
For every $i,j\in[n]$ and every $W\in\saw{ij}{s}$, we have
$$\mathbb{E}[\mathbf{Y}_W|\mathbf{x}]=\left(\frac{\epsilon d}{2n}\right)^s \mathbf{x}_W=\left(\frac{\epsilon d}{2n}\right)^s \mathbf{x}_i\mathbf{x}_j.$$
\end{lemma}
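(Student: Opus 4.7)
The plan is to apply Fact \ref{fact:sbm-expectation-polynomial-2} directly and then exploit the self-avoiding structure of $W$ to telescope the product of community labels. Since $W \in \saw{ij}{s}$ is self-avoiding of length $s$, its edges are all distinct, so $W$ (viewed as a graph) is a simple graph on $s+1$ vertices with $s$ edges, and moreover the entries $(\mathbf{Y}_{uv})_{uv \in E(W)}$ are conditionally mutually independent given $\mathbf{x}$ (each edge appears with multiplicity one, so no entry of $\mathbf{Y}$ is repeated in the product $\mathbf{Y}_W$).

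First I would invoke Fact \ref{fact:sbm-expectation-polynomial-2} with $F = W$, which gives
\[
\mathbb{E}[\mathbf{Y}_W \mid \mathbf{x}] \;=\; \left(\frac{\epsilon d}{2n}\right)^{s} \prod_{uv \in E(W)} \mathbf{x}_u \mathbf{x}_v.
\]
Then I would write $W$ as the sequence of vertices $i = v_0, v_1, \ldots, v_s = j$ and rewrite the right-hand product as $\prod_{\ell=0}^{s-1} \mathbf{x}_{v_\ell} \mathbf{x}_{v_{\ell+1}}$. Since $W$ is self-avoiding, each internal vertex $v_\ell$ (for $1 \le \ell \le s-1$) appears in exactly two consecutive edge-terms, so its label $\mathbf{x}_{v_\ell} \in \{\pm 1\}$ is squared and contributes $1$. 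Only the endpoints $v_0 = i$ and $v_s = j$ survive, yielding $\prod_{uv \in E(W)} \mathbf{x}_u \mathbf{x}_v = \mathbf{x}_i \mathbf{x}_j = \mathbf{x}_W$, which gives the claim.

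There is no real obstacle here: the lemma is essentially an immediate corollary of Fact \ref{fact:sbm-expectation-polynomial-2} combined with the elementary telescoping identity for self-avoiding walks on the hypercube $\{\pm 1\}^n$. The only thing to be careful about is making sure the telescoping is stated cleanly (i.e., explicitly checking that each internal vertex appears with even multiplicity in the edge product). This lemma is the base case that makes $\mathbf{Y}_W - (\epsilon d/2n)^s \mathbf{x}_W$ a conditionally mean-zero quantity, which is the starting point for the subsequent analysis of $\hat{\mathbf{Y}}_H$ in the centered trace.
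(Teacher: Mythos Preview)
Your proposal is correct and essentially identical to the paper's proof: the paper also factors $\mathbb{E}[\mathbf{Y}_W\mid\mathbf{x}]$ over the edges of $W$ using conditional independence (writing out the single-edge computation inline rather than citing Fact~\ref{fact:sbm-expectation-polynomial-2}) and then uses the same telescoping $\prod_{uv\in W}\mathbf{x}_u\mathbf{x}_v=\mathbf{x}_i\mathbf{x}_j$.
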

\begin{proof}
We have:
\begin{align*}
\mathbb{E}[\mathbf{Y}_W|\mathbf{x}]&=\mathbb{E}\left[\prod_{uv\in W} \mathbf{Y}_{uv}\middle|\mathbf{x}\right]=\prod_{uv\in W} \mathbb{E}\left[\mathbf{Y}_{uv}\middle|\mathbf{x}\right]=\prod_{uv\in W}\left(\frac{\epsilon d \mathbf{x}_u\mathbf{x}_v}{2n}\right) \\
&=\left(\frac{\epsilon d}{2n}\right)^s\prod_{uv\in W}\mathbf{x}_u\mathbf{x}_v=\left(\frac{\epsilon d}{2n}\right)^s \mathbf{x}_W=\left(\frac{\epsilon d}{2n}\right)^s \mathbf{x}_i\mathbf{x}_j.
\end{align*}
\end{proof}

\Cref{lem:lem_expectation_walk_multiplicity_1} implies that for every $W\in \mathcal{W}(H)$, we have:
$$\mathbb{E}\left[\mathbf{Y}_{W}-\left(\frac{\epsilon d}{2n}\right)^s\mathbf{x}_{W}\middle|\mathbf{x}\right]=0.$$

Therefore, if there is one walk $W'\in\mathcal{W}(H)$ such that all the edges in $W'$ have multiplicity 1 in $H$, then $\displaystyle \mathbf{Y}_{W'}-\left(\frac{\epsilon d}{2n}\right)^s\mathbf{x}_{W'}$ is conditionally independent from $\displaystyle \left(\mathbf{Y}_{W''}-\left(\frac{\epsilon d}{2n}\right)^s\mathbf{x}_{W''}\right)_{W''\in\mathcal{W}(H)\setminus\{W'\}}$, hence
\begin{equation}
\label{eq:eq_walk_mult_1_non_truncated}
\begin{aligned}
&\mathbb{E}\left[\prod_{W\in\mathcal{W}(H)}\left(\mathbf{Y}_{W}-\left(\frac{\epsilon d}{2n}\right)^s\mathbf{x}_{W}\right)\middle|\mathbf{x}\right]\\
&\quad\quad\quad\quad\quad\quad=\mathbb{E}\left[\mathbf{Y}_{W'}-\left(\frac{\epsilon d}{2n}\right)^s\mathbf{x}_{W'}\middle|\mathbf{x}\right]\cdot \mathbb{E}\left[\prod_{W''\in\mathcal{W}(H)\setminus\{W'\}}\left(\mathbf{Y}_{W''}-\left(\frac{\epsilon d}{2n}\right)^s\mathbf{x}_{W''}\right)\middle|\mathbf{x}\right]=0.
\end{aligned}
\end{equation}

On the other hand, if every walk $W\in\mathcal{W}(H)$ has at least one edge of multiplicity $\geq\hspace*{-1.2mm}2$, we can do the following:
\begin{align*}
&\mathbb{E}\left[\prod_{W\in\mathcal{W}(H)}\left(\mathbf{Y}_{W}-\left(\frac{\epsilon d}{2n}\right)^s\mathbf{x}_{W}\right)\middle|\mathbf{x}\right]\\
&\quad\quad\quad\quad\quad\quad\quad\quad\quad=\resizebox{0.65\textwidth}{!}{$\displaystyle\sum_{\mathcal{W}\subseteq\mathcal{W}(H)}(-1)^{|\mathcal{W}(H)|-|\mathcal{W}|}\cdot\mathbb{E}\left[\left(\prod_{W\in\mathcal{W}}\mathbf{Y}_{W}\right)\cdot\left( \prod_{W\in \mathcal{W}(H)\setminus \mathcal{W}}\left(\frac{\epsilon d}{2n}\right)^s\mathbf{x}_{W}\right)\middle|\mathbf{x}\right]$}\\
&\quad\quad\quad\quad\quad\quad\quad\quad\quad=\sum_{\mathcal{W}\subseteq\mathcal{W}(H)}(-1)^{t-|\mathcal{W}|}\cdot\left( \prod_{W\in \mathcal{W}(H)\setminus \mathcal{W}}\left(\frac{\epsilon d}{2n}\right)^s\mathbf{x}_{W}\right)\cdot\mathbb{E}\left[\prod_{W\in\mathcal{W}}\mathbf{Y}_{W}\middle|\mathbf{x}\right].
\end{align*}
Therefore,
\begin{equation}
\label{eq:eq_centered_non_truncated}
\begin{aligned}
\left|\mathbb{E}\left[\prod_{W\in\mathcal{W}(H)}\left(\mathbf{Y}_{W}-\left(\frac{\epsilon d}{2n}\right)^s\mathbf{x}_{W}\right)\middle|\mathbf{x}\right]\right|&\leq\sum_{\mathcal{W}\subseteq\mathcal{W}(H)}\left(\frac{\epsilon d}{2n}\right)^{s(|\mathcal{W}(H)|-|\mathcal{W}|)}\cdot\left|\mathbb{E}\left[\prod_{W\in\mathcal{W}}\mathbf{Y}_{W}\middle|\mathbf{x}\right]\right|\\
&=\sum_{\mathcal{W}\subseteq\mathcal{W}(H)}\left(\frac{\epsilon d}{2n}\right)^{s(t-|\mathcal{W}|)}\cdot\big|\mathbb{E}\big[\mathbf{Y}_{H_{\mathcal{W}}}\big|\mathbf{x}\big]\big|,
\end{aligned}
\end{equation}
where
$$H_{\mathcal{W}}=\bigoplus_{W\in\mathcal{W}}W.$$

Recall that in the non-truncated case, for every edge $uv\in E(H)$, we have
$$\mathbb{E}\big[\mathbf{Y}_{uv}^{m_H(uv)}\big|\mathbf{x}\big]=\begin{cases}\frac{\epsilon \mathbf{x}_u\mathbf{x}_v d}{2n}\quad&\text{if }m_H(uv)=1,\\\left(1+\frac{\epsilon \mathbf{x}_u\mathbf{x}_v}{2}\right)\frac{d}{n}+O\left(\frac{1}{n^2}\right)\quad&\text{if }m_H(uv)\geq 2.\end{cases}$$

Let $\mathcal{W}\in\mathcal{W}(H)$ and let $W\in\mathcal{W}$. Define $\mathcal{W}'=\mathcal{W}\setminus\{W\}$. We have the following possibilities:
\begin{itemize}
\item[(a)] $W\cap E(H_{\mathcal{W}'})=\varnothing$, in which case we have
$$\mathbb{E}\big[\mathbf{Y}_{H_{\mathcal{W}}}\big|\mathbf{x}\big]=\mathbb{E}\big[\mathbf{Y}_{W}\big|\mathbf{x}\big]\cdot \mathbb{E}\big[\mathbf{Y}_{H_{\mathcal{W}'}}\big|\mathbf{x}\big]=\left(\frac{\epsilon d}{2n}\right)^s \mathbf{x}_W\cdot \mathbb{E}\big[\mathbf{Y}_{H_{\mathcal{W}'}}\big|\mathbf{x}\big],$$
which implies that
\begin{equation}
\label{eq:eq_non_truncated_centered_mult_1_walk_equality}
\left(\frac{\epsilon d}{2n}\right)^{s(t-|\mathcal{W}|)}\cdot\big|\mathbb{E}\big[\mathbf{Y}_{H_{\mathcal{W}}}\big|\mathbf{x}\big]\big|=\left(\frac{\epsilon d}{2n}\right)^{s(t-|\mathcal{W}'|)}\cdot\big|\mathbb{E}\big[\mathbf{Y}_{H_{\mathcal{W}'}}\big|\mathbf{x}\big]\big|.
\end{equation}
\item[(b)] $W\cap E(H_{\mathcal{W}'})\neq\varnothing$. In this case, let $\mathsf{E}=W\cap E(H_{\mathcal{W}'})$ and partition $\mathsf E$ into 
$$\mathsf E_1=W\cap E_1(H_{\mathcal{W}'})=\big\{e\in W\cap E(H_{\mathcal{W}'}):\; e\text{ has multiplicity 1 in }H_{\mathcal{W}'}\big\},$$
and
$$\mathsf E_{\geq 2}=W\cap E_{\geq 2}(H_{\mathcal{W}'})=\big\{e\in W\cap E(H_{\mathcal{W}'}):\; e\text{ has multiplicity at least 2 in }H_{\mathcal{W}'}\big\}.$$
It is easy to see that we have:
\begin{align*}
\big|\mathbb{E}\big[\mathbf{Y}_{H_{\mathcal{W}}}\big|\mathbf{x}\big]\big|&=(1\pm o(1))\cdot\left(\frac{\epsilon d}{2n}\right)^{|W|-|\mathsf E|}\cdot \left[\prod_{uv\in \mathsf E_1}\left[\left(1+\frac{\epsilon \mathbf{x}_u\mathbf{x}_v}{2}\right)\frac{d}{n}+O\left(\frac{1}{n^2}\right)\right]\right]\cdot\frac{\big|\mathbb{E}\big[\mathbf{Y}_{H_{\mathcal{W}'}}\big|\mathbf{x}\big]\big|}{\left(\frac{\epsilon d}{2n}\right)^{|\mathsf E_1|}}\\
&=(1\pm o(1))\left(\frac{\epsilon d}{2n}\right)^{s-|\mathsf E_1|-|\mathsf E|}\cdot \left[\prod_{uv\in \mathsf E_1}\left(1+\frac{\epsilon \mathbf{x}_u\mathbf{x}_v}{2}\right)\frac{d}{n}\right]\cdot\big|\mathbb{E}\big[\mathbf{Y}_{H_{\mathcal{W}'}}\big|\mathbf{x}\big]\big|.
\end{align*}
Therefore,
\begin{align*}
\left(\frac{\epsilon d}{2n}\right)^{s(t-|\mathcal{W}|)}&\cdot\big|\mathbb{E}\big[\mathbf{Y}_{H_{\mathcal{W}}}\big|\mathbf{x}\big]\big|\\
&=(1\pm o(1))\left(\frac{\epsilon d}{2n}\right)^{s(t-|\mathcal{W}|+1)-|\mathsf E_1|-|\mathsf E|}\cdot \left[\prod_{uv\in \mathsf E_1}\left(1+\frac{\epsilon \mathbf{x}_u\mathbf{x}_v}{2}\right)\frac{d}{n}\right]\cdot\big|\mathbb{E}\big[\mathbf{Y}_{H_{\mathcal{W}'}}\big|\mathbf{x}\big]\big|\\
&=(1\pm o(1))\left(\frac{\epsilon d}{2n}\right)^{s(t-|\mathcal{W}'|)-|\mathsf E_1|-|\mathsf E|}\cdot \left[\prod_{uv\in \mathsf E_1}\left(1+\frac{\epsilon \mathbf{x}_u\mathbf{x}_v}{2}\right)\frac{d}{n}\right]\cdot\big|\mathbb{E}\big[\mathbf{Y}_{H_{\mathcal{W}'}}\big|\mathbf{x}\big]\big|\\
&\geq (1\pm o(1))\left(\frac{\epsilon d}{2n}\right)^{-|\mathsf E_1|-|\mathsf E|}\cdot \left(\left(1-\frac{\epsilon}{2}\right)\frac{d}{n}\right)^{|\mathsf E_1|}\cdot\left(\frac{\epsilon d}{2n}\right)^{s(t-|\mathcal{W}'|)}\cdot \big|\mathbb{E}\big[\mathbf{Y}_{H_{\mathcal{W}'}}\big|\mathbf{x}\big]\big|,
\end{align*}
which implies that
\begin{equation}
\label{eq:eq_non_truncated_O_epsilon_n}
\begin{aligned}
&\left(\frac{\epsilon d}{2n}\right)^{s(t-|\mathcal{W}'|)}\cdot \big|\mathbb{E}\big[\mathbf{Y}_{H_{\mathcal{W}'}}\big|\mathbf{x}\big]\big|\\
&\quad\quad\quad\quad\quad\quad\leq (1\pm o(1))\left(\frac{\frac{\epsilon}{2}}{1-\frac{\epsilon}{2}}\right)^{|\mathsf E_1|}\cdot\left(\frac{\epsilon d}{2n}\right)^{|\mathsf E|}\cdot \left(\frac{\epsilon d}{2n}\right)^{s(t-|\mathcal{W}|)}\cdot\big|\mathbb{E}\big[\mathbf{Y}_{H_{\mathcal{W}}}\big|\mathbf{x}\big]\big|\\
&\quad\quad\quad\quad\quad\quad=O\left(\frac{1}{n^{|E|}}\right)\cdot \left(\frac{\epsilon d}{2n}\right)^{s(t-|\mathcal{W}|)}\cdot\big|\mathbb{E}\big[\mathbf{Y}_{H_{\mathcal{W}}}\big|\mathbf{x}\big]\big|.
\end{aligned}
\end{equation}
\end{itemize}

From \cref{eq:eq_non_truncated_centered_mult_1_walk_equality} and \cref{eq:eq_non_truncated_O_epsilon_n}, we conclude that if $n$ is large enough, then we always have 
\begin{equation}
\label{eq:eq_non_truncated_always_inequality}
\left(\frac{\epsilon d}{2n}\right)^{s(t-|\mathcal{W}'|)}\cdot\big|\mathbb{E}\big[\mathbf{Y}_{H_{\mathcal{W}'}}\big|\mathbf{x}\big]\big|\leq \left(\frac{\epsilon d}{2n}\right)^{s(t-|\mathcal{W}|)}\cdot\big|\mathbb{E}\big[\mathbf{Y}_{H_{\mathcal{W}}}\big|\mathbf{x}\big]\big|.
\end{equation}

\begin{remark}
Note that in \cref{eq:eq_non_truncated_O_epsilon_n} and in the above equation, we implicitly used $\frac{\epsilon}{2-\epsilon}=o(n)$. This means that if $\epsilon$ is close to 2, then $n$ should be large in order for the above equation to be true. We used $\frac{\epsilon}{2-\epsilon}=o(n)$ here because this is an informal discussion, and using $\frac{\epsilon}{2-\epsilon}=o(n)$ will help us in illustrating the proof strategy in a simple way. However, when we formally compute the upper bound for the truncated case, we will not use $\frac{\epsilon}{2-\epsilon}=o(n)$. The main reason why we avoided using $\frac{\epsilon}{2-\epsilon}=o(n)$ in the formal proof is because we do not want to require $n$ to be larger than necessary in order for our results to hold. More precisely, if $d\gg 1$ and $\epsilon$ is close to 2, the weak recovery problem should be easy, and we should not require $n$ to be too large.
\end{remark}

If $W\cap E(H_{\mathcal{W}'})\neq\varnothing$, then \cref{eq:eq_non_truncated_O_epsilon_n} implies that
$$\left(\frac{\epsilon d}{2n}\right)^{s(t-|\mathcal{W}'|)}\cdot\big|\mathbb{E}\big[\mathbf{Y}_{H_{\mathcal{W}'}}\big|\mathbf{x}\big]\big|\leq O\left(\frac{1}{n}\right)\cdot \left(\frac{\epsilon d}{2n}\right)^{s(t-|\mathcal{W}|)}\cdot\big|\mathbb{E}\big[\mathbf{Y}_{H_{\mathcal{W}}}\big|\mathbf{x}\big]\big|.$$

Now if every walk in $\mathcal{W}(H)$ contains at least one edge of multiplicity $\geq\hspace*{-1.2mm}2$ in $H$, then for every $W\in\mathcal{W}(H)$, we have
\begin{align*}
\left(\frac{\epsilon d}{2n}\right)^{s(t-|\mathcal{W}(H)\setminus\{W\}|)}\cdot\big|\mathbb{E}\big[\mathbf{Y}_{H_{\mathcal{W}(H)\setminus\{W\}}}\big|\mathbf{x}\big]\big|&\leq O\left(\frac{1}{n}\right)\cdot \left(\frac{\epsilon d}{2n}\right)^{s(t-|\mathcal{W}(H)|)}\cdot \big|\mathbb{E}\big[\mathbf{Y}_{H_{\mathcal{W}(H)}}\big|\mathbf{x}\big]\big|\\
&= O\left(\frac{1}{n}\right)\cdot \big|\mathbb{E}\big[\mathbf{Y}_H\big|\mathbf{x}\big]\big|.
\end{align*}
On the other hand, for every $\mathcal{W}\subsetneq\mathcal{W}(H)$, there exists $W\in\mathcal{W}(H)$ such that $\mathcal{W}\subseteq\mathcal{W}(H)\setminus\{W\}$. From \cref{eq:eq_non_truncated_always_inequality} we can deduce that
\begin{align*}
\left(\frac{\epsilon d}{2n}\right)^{s(t-|\mathcal{W}|)}\cdot\big|\mathbb{E}\big[\mathbf{Y}_{H_{\mathcal{W}}}\big|\mathbf{x}\big]\big|&\leq
\left(\frac{\epsilon d}{2n}\right)^{s(t-|\mathcal{W}(H)\setminus\{W\}|)}\cdot\big|\mathbb{E}\big[\mathbf{Y}_{H_{\mathcal{W}(H)\setminus\{W\}}}\big|\mathbf{x}\big]\big|\\
&\leq O\left(\frac{1}{n}\right)\cdot \big|\mathbb{E}\big[\mathbf{Y}_H\big|\mathbf{x}\big]\big|.
\end{align*}

If we put this in \cref{eq:eq_centered_non_truncated}, we get
\begin{equation}
\label{eq:eq_walk_mult_2_non_truncated_bound}
\begin{aligned}
\left|\mathbb{E}\left[\prod_{W\in\mathcal{W}(H)}\left(\mathbf{Y}_{W}-\left(\frac{\epsilon d}{2n}\right)^s\mathbf{x}_{W}\right)\middle|\mathbf{x}\right]\right|&\leq \big|\mathbb{E}\big[\mathbf{Y}_H\big|\mathbf{x}\big]\big|+\sum_{\mathcal{W}\subsetneq\mathcal{W}(H)}\left(\frac{\epsilon d}{2n}\right)^{s(t-|\mathcal{W}|)}\cdot\big|\mathbb{E}\big[\mathbf{Y}_{H_{\mathcal{W}}}\big|\mathbf{x}\big]\big|\\
&\leq \big|\mathbb{E}\big[\mathbf{Y}_H\big|\mathbf{x}\big]\big|+\sum_{\mathcal{W}\subsetneq\mathcal{W}(H)}O\left(\frac{1}{n}\right)\cdot \big|\mathbb{E}\big[\mathbf{Y}_H\big|\mathbf{x}\big]\big|\\
&\leq \left(1+O\left(\frac{2^{|\mathcal{W}(H)|}-1}{n}\right)\right)\cdot \big|\mathbb{E}\big[\mathbf{Y}_H\big|\mathbf{x}\big]\big|\\
&= \left(1+O\left(\frac{2^t-1}{n}\right)\right)\cdot \big|\mathbb{E}\big[\mathbf{Y}_H\big|\mathbf{x}\big]\big|\\
&= (1+o(1))\cdot \big|\mathbb{E}\big[\mathbf{Y}_H\big|\mathbf{x}\big]\big|,
\end{aligned}
\end{equation}
where the last equality follows from the fact that $2^t\leq 2^{K\log n}=n^K= o(n)$, assuming that $K<1$.

In summary, if there is at least one walk in $\mathcal{W}(H)$ such that all its edges have multiplicity 1 in $H$, then 
\begin{align*}
\left|\mathbb{E}\left[\prod_{W\in\mathcal{W}(H)}\left(\mathbf{Y}_{W}-\left(\frac{\epsilon d}{2n}\right)^s\mathbf{x}_{W}\right)\middle|\mathbf{x}\right]\right|=0.
\end{align*}
On the other hand, if every walk in $\mathcal{W}(H)$ contains at least one edge of multiplicity $\geq\hspace*{-1.2mm}2$ in $H$, then
\begin{align*}
\left|\mathbb{E}\left[\prod_{W\in\mathcal{W}(H)}\left(\mathbf{Y}_{W}-\left(\frac{\epsilon d}{2n}\right)^s\mathbf{x}_{W}\right)\middle|\mathbf{x}\right]\right|&\leq  (1+o(1))\cdot \big|\mathbb{E}\big[\mathbf{Y}_H\big|\mathbf{x}\big]\big|.
\end{align*}

An approximate version of this phenomenon occurs in the truncated case. We will show that if there is at least one walk in $\mathcal{W}(H)$ such that all its edges have multiplicity 1 in $H$ and all its vertices have low degrees in $H$, then $\big|\mathbb{E}\big[\hat{\mathbf{Y}}_H\big|\mathbf{x}\big]\big|$ will be very small. In all the other cases, we will show that $\big|\mathbb{E}\big[\hat{\mathbf{Y}}_H\big|\mathbf{x}\big]\big|$ is not too large compared to $\big|\mathbb{E}\big[\overline{\mathbf{Y}}_H\big|\mathbf{x}\big]\big|$.

As we will see, most nice block self-avoiding-walks have many walks whose edges are all of multiplicity 1. This means that the multigraphs that contributed significantly in the case of the non-centered matrix $\mathbb{E}\left[\Tr\left(Q^{s}\big(\overline{\mathbf{Y}}\big)^t\right)\right]$, will contribute very little in the case of the centered matrix $\mathbb{E}\left[\Tr\left(\big(Q^{(s)}\big(\overline{\mathbf{Y}}\big)-\mathbf{x}\transpose{\mathbf{x}}\big)^t\right)\right]$.

On the other hand, multigraphs that contributed negligibly in the case of the non-centered matrix $\mathbb{E}\left[\Tr\left(Q^{s}\big(\overline{\mathbf{Y}}\big)^t\right)\right]$, will contribute a comparable amount in the case of the centered matrix $\mathbb{E}\left[\Tr\left(\big(Q^{(s)}\big(\overline{\mathbf{Y}}\big)-\mathbf{x}\transpose{\mathbf{x}}\big)^t\right)\right]$. This essentially means that $\mathbb{E}\left[\Tr\left(\big(Q^{(s)}\big(\overline{\mathbf{Y}}\big)-\mathbf{x}\transpose{\mathbf{x}}\big)^t\right)\right]$ is negligible with respect to $\mathbb{E}\left[\Tr\left(Q^{s}\big(\overline{\mathbf{Y}}\big)^t\right)\right]$.

\subsubsection{Analyzing walks of multiplicity 1}

The discussion in the previous section motivates the following definition:

\begin{definition}
\label{def:def_multiplicity_for_walks}
Let $H\in\bsaw{s}{t}$. For every $W\in\mathcal{W}(H)$, we say that $W$ is of multiplicity 1 in $H$ if \emph{every} edge in $W$ is of multiplicity 1 in $H$. Define
$$\mathcal{W}_1(H)=\big\{W\in\mathcal{W}(H):\;W\text{ is of multiplicity 1 in $H$}\big\},$$
and
$$\mathcal{W}_{\geq 2}(H)=\mathcal{W}(H)\setminus \mathcal{W}_1(H).$$
\end{definition}

In the previous section, we used the fact that if $H$ contains a walk of multiplicity 1, then 
\begin{align*}
\left|\mathbb{E}\left[\prod_{W\in\mathcal{W}(H)}\left(\mathbf{Y}_{W}-\left(\frac{\epsilon d}{2n}\right)^s\mathbf{x}_{W}\right)\middle|\mathbf{x}\right]\right|=0.
\end{align*}

We would like to show something similar for the truncated case. Recall that if an edge has both its end-vertices in $\mathcal{S}_1(H)\cap\mathcal{S}_{\geq 2}(H)$,\footnote{Recall \cref{def:def_deg1_classification} and \cref{def:def_deg2_classification}} then there is a significant probability that it will be a safe edge\footnote{Recall \cref{def:def_safe_vertices}}, in which case it will behave similarly to the non-truncated case. This motivates the following definition:

\begin{definition}
A walk $W\in\mathcal{W}_1(H)$ is said to be \emph{reassuring} if $V(W)\subseteq \mathcal{S}_1(H) \cap\mathcal{S}_{\geq 2}(H)$. We denote the set of reassuring walks in $\mathcal{W}_1(H)$ as $\mathcal{W}_{1r}(H)$.

A walk $W\in\mathcal{W}_1(H)$ that is not reassuring is said to be \emph{disturbing}. We denote the set of disturbing walks in $\mathcal{W}_1(H)$ as $\mathcal{W}_{1d}(H)$.

Clearly, $\big\{\mathcal{W}_{1r}(H),\mathcal{W}_{1d}(H)\big\}$ is a partition of $\mathcal{W}_1(H)$.
\end{definition}

Roughly speaking, if $H$ contains a reassuring walk, then with significant probability this walk will behave similarly to the truncated case. This will cause $\big|\mathbb{E}\big[\hat{\mathbf{Y}}_H\big|\mathbf{x}\big]\big|$ to be small.

\begin{definition}
Recall \cref{def:def_safe_vertices} and let $H\in\bsaw{s}{t}$. For every walk $W\in \mathcal{W}(H)$, if $V(W)$ is completely $(\mathbf{G},H)$-\emph{safe}, we say that $W$ is $(\mathbf{G},H)$-\emph{walk-safe}. We say that $W$ is $(\mathbf{G},H)$-\emph{walk-unsafe} if it is not $(\mathbf{G},H)$-\emph{walk-safe}.

We say that a subset $\mathcal{W}$ of $\mathcal{W}(H)$ is \emph{completely} $(\mathbf{G},H)$-\emph{walk-safe} if all the walks in it are $(\mathbf{G},H)$-\emph{walk-safe}. Similarly, we say that $\mathcal{W}$ is \emph{completely} $(\mathbf{G},H)$-\emph{walk-unsafe} if all the walks in it are $(\mathbf{G},H)$-\emph{walk-unsafe}.

If $\mathbf{G}$ and $H$ are clear from the context, we drop $(\mathbf{G},H)$ and simply write walk-safe, completely walk-safe, walk-unsafe, and completely walk-unsafe.
\end{definition}

We emphasize that in order for a set of walks to be completely walk-unsafe, it is not necessary that the set of vertices forming the walks in it is completely unsafe: It is sufficient that every walk contains at least one unsafe vertex.

\begin{definition}
Let $H\in\bsaw{s}{t}$. For every $\mathcal{W}\subset\mathcal{W}(H)$, we define
$$H_{\mathcal{W}}=\bigoplus_{W\in\mathcal{W}}W,$$
and so
$$\overline{\mathbf{Y}}_{H_{\mathcal{W}}}=\overline{\mathbf{Y}}_{\resizebox{0.05\textwidth}{!}{$\displaystyle\bigoplus_{W\in\mathcal{W}}W$}}=\prod_{W\in\mathcal{W}} \overline{\mathbf{Y}}_W.$$
\end{definition}

\begin{lemma}
\label{lem:lem_X_hat_mult_1}
Let $H\in\bsaw{s}{t}$ be such that $t=K\log n$. We have

\begin{align*}
\big|\mathbb{E}\big[\hat{\mathbf{Y}}_{H}\big|\mathbf{x}\big]\big|\leq \frac{n^{\frac{2K}{A}}}{2^{3As\cdot|\mathcal{W}_{1r}(H)|}} \cdot\left(\frac{6}{\epsilon}\right)^{|E_1^a(H)|}\cdot2^{|\mathcal{W}_1(H)|}\cdot \left(\frac{\epsilon d}{2n}\right)^{s|\mathcal{W}_1(H)|}\cdot\sum_{\mathcal{W}_{\geq 2}\subseteq \mathcal{W}_{\geq 2}(H)}F_{\mathcal{W}_{\geq 2}}(\mathbf{x}),
\end{align*}

where
$$F_{\mathcal{W}_{\geq 2}}(\mathbf{x})=\left(\frac{3d}{n}\right)^{s(|\mathcal{W}_{\geq 2}(H)|-|\mathcal{W}_{\geq 2}|)}\left(\frac{\epsilon d}{2n}\right)^{|E_1(H_{\mathcal{W}_{\geq 2}})|} \cdot\mathbb{E}\big[|\tilde{\mathbf{Y}}_{E_{\geq 2}(H_{\mathcal{W}_{\geq 2}})}|\big|\mathbf{x}\big],$$
and $\tilde{\mathbf{Y}}_{E_{\geq 2}(H_{\mathcal{W}_{\geq 2}})}$ is as in \Cref{def:def_tilde_X_E_2}
\end{lemma}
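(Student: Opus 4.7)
The plan is to expand $\hat{\mathbf{Y}}_H$ as a sum over subsets of walks, following the template of the non-truncated warm-up and the proofs of \Cref{lem:lem_UH_deg1} and \Cref{lem:lem_UH_deg2}, with the additional idea that each reassuring walk contributes a further $2^{-3As}$ factor thanks to mean-zero cancellation.

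First I would expand
\[
\hat{\mathbf{Y}}_H = \prod_{W \in \mathcal{W}(H)} \bigl(\overline{\mathbf{Y}}_W - (\epsilon d/2n)^s \mathbf{x}_W\bigr) = \sum_{\mathcal{W}^{(1)} \subseteq \mathcal{W}_1(H)} \sum_{\mathcal{W}_{\geq 2} \subseteq \mathcal{W}_{\geq 2}(H)} \prod_{W \in \mathcal{W}^{(1)} \cup \mathcal{W}_{\geq 2}} \overline{\mathbf{Y}}_W \prod_{W \notin \mathcal{W}^{(1)} \cup \mathcal{W}_{\geq 2}} \bigl(-(\epsilon d/2n)^s \mathbf{x}_W\bigr),
\]
then take absolute values and conditional expectations. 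Since every walk in $\mathcal{W}_1(H)$ is edge-disjoint from every other walk of $\mathcal{W}(H)$, the multigraph $H' := H_{\mathcal{W}^{(1)} \cup \mathcal{W}_{\geq 2}}$ satisfies $|E_1(H')| = s|\mathcal{W}^{(1)}| + |E_1(H_{\mathcal{W}_{\geq 2}})|$, $E_{\geq 2}(H') = E_{\geq 2}(H_{\mathcal{W}_{\geq 2}})$, and $|E_1^a(H')|$ can be bounded by $|E_1^a(H)|$. Applying \Cref{lem:lem_UH_deg1} to $H'$ and collecting factors, then performing the binary sum over $\mathcal{W}^{(1)}$ (which produces the $2^{|\mathcal{W}_1(H)|}$) and using $\epsilon d/2n \leq 3d/n$ for walks in $\mathcal{W}_{\geq 2}(H) \setminus \mathcal{W}_{\geq 2}$, one obtains
\[
|\mathbb{E}[\hat{\mathbf{Y}}_H \mid \mathbf{x}]| \leq n^{\frac{2K}{A}} \left(\frac{6}{\epsilon}\right)^{|E_1^a(H)|} 2^{|\mathcal{W}_1(H)|} \left(\frac{\epsilon d}{2n}\right)^{s|\mathcal{W}_1(H)|} \sum_{\mathcal{W}_{\geq 2} \subseteq \mathcal{W}_{\geq 2}(H)} F_{\mathcal{W}_{\geq 2}}(\mathbf{x}),
\]
which matches the claim apart from the missing $2^{-3As|\mathcal{W}_{1r}(H)|}$ improvement.

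To recover that improvement I would exploit, for each reassuring walk $W \in \mathcal{W}_{1r}(H)$, the identity
\[
\overline{\mathbf{Y}}_W - (\epsilon d/2n)^s \mathbf{x}_W = \bigl(\mathbf{Y}_W - (\epsilon d/2n)^s \mathbf{x}_W\bigr) - \mathbf{Y}_W \mathbbm{1}_{\mathcal{E}_W}.
\]
By \Cref{lem:lem_expectation_walk_multiplicity_1}, the first summand has zero conditional expectation given $\mathbf{x}$; by \Cref{lem:lem_bound_prob_completely_unsafe}, since $V(W) \subseteq \mathcal{S}_1(H) \cap \mathcal{S}_{\geq 2}(H)$, the truncation event $\mathcal{E}_W$ has probability bounded by the extremely small quantity $\eta$ of \cref{eq:eq_def_eta}. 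I would peel reassuring walks one at a time, conditioning on a $\sigma$-algebra that contains $\mathbf{x}$ together with all edges outside the current walk: the mean-zero part vanishes, and the residual is controlled by the truncation probability. Since $\eta$ is dramatically smaller than $2^{-3As}$, each reassuring walk contributes a slack of at least $2^{-3As}$ relative to the naive bound, and iterating through the $|\mathcal{W}_{1r}(H)|$ walks yields the claimed exponent.

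The main obstacle is this peeling step. Reassuring walks share vertices but not edges, so their truncation events are coupled: a single high-degree vertex shared by many walks causes all of them to be simultaneously truncated. The conditioning must be chosen carefully to break this coupling while preserving the mean-zero property, which requires tracking how $\overline{\mathbf{Y}}_{W'}$ for other walks $W'$ depends on the edges being integrated out. The enormous gap between $\eta$ and $2^{-3As}$ leaves ample slack to absorb the residual coupling, so the argument, while bookkeeping-heavy, should go through without a fundamentally new idea.
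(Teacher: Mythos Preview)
Your first half—expanding over $\mathcal{W}^{(1)} \subseteq \mathcal{W}_1(H)$ and $\mathcal{W}_{\geq 2} \subseteq \mathcal{W}_{\geq 2}(H)$ and applying the multiplicity-1 bound to each $H_{\mathcal{W}}$—is essentially what the paper does after its conditioning step. One correction: the claim $|E_1^a(H')| \leq |E_1^a(H)|$ is false, since removing walks from $\mathcal{W}_{\geq 2}(H) \setminus \mathcal{W}_{\geq 2}$ can turn multiplicity-$\geq 2$ edges into new multiplicity-1 edges incident to $1$-large vertices. The correct bound is $|E_1^a(H_{\mathcal{W}})| \leq |E_1^a(H)| + s\,|\mathcal{W}_{\geq 2}(H) \setminus \mathcal{W}_{\geq 2}|$, and the extra $(6/\epsilon)^{s|\mathcal{W}_{\geq 2}(H)\setminus\mathcal{W}_{\geq 2}|}$ is exactly what converts $(\epsilon d/2n)^s$ into $(3d/n)^s$—not an optional weakening as you suggest.

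The real gap is the improvement step. Your peeling argument fails at the second walk: after peeling $W_1$ via your identity, the surviving factor contains $\mathbbm{1}_{\mathcal{E}_{V(W_1)}}$, which depends on the degrees in $\mathbf{G}$ of vertices in $V(W_1)$ and hence on the edges of $W_2$ incident to any shared vertex. When you then try to integrate out $\mathbf{Y}_{W_2} - (\epsilon d/2n)^s\mathbf{x}_{W_2}$ while conditioning on $\sigma(\mathbf{x},\mathbf{G}-W_2)$, the remaining factor is \emph{not} measurable with respect to that $\sigma$-algebra, so the mean-zero cancellation no longer holds. This is structural, not quantitative; the gap between $\eta$ and $2^{-3As}$ does not repair it.

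The paper sidesteps peeling entirely. It partitions the probability space according to which subset $\mathcal{W} \subseteq \mathcal{W}_{1r}(H)$ is walk-safe; since walk-safety depends only on $\mathbf{G}-G(H)$, this partition is conditionally independent of all $\mathbf{Y}$-values on $E(H)$ given $\mathbf{x}$, and \Cref{lem:lem_expectation_safe_walk} gives zero for every $\mathcal{W} \neq \varnothing$. Hence
\[
\mathbb{E}\big[\hat{\mathbf{Y}}_H \,\big|\, \mathbf{x}\big]
= \mathbb{E}\big[\hat{\mathbf{Y}}_H \,\big|\, \mathbf{x},\ \mathcal{E}_{\mathcal{W}_{1r}(H),H,c\text{-}wus}\big]\cdot
\mathbb{P}\big[\mathcal{E}_{\mathcal{W}_{1r}(H),H,c\text{-}wus}\,\big|\,\mathbf{x}\big].
\]
The small factor comes from the \emph{joint} probability on the right, bounded in \Cref{lem:lem_bound_prob_walks_completely_unsafe} by extracting a vertex-disjoint subcollection of reassuring walks of size at least $|\mathcal{W}_{1r}(H)|/(2s\tau)$, picking one unsafe vertex per walk, and applying \Cref{lem:lem_bound_prob_completely_unsafe} to that set; this vertex-disjoint extraction is the decoupling step you are missing. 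The conditional expectation is then bounded not by \Cref{lem:lem_UH_deg1} but by its variant \Cref{lem:lem_UH_deg1_common} with $U$ chosen to \emph{exclude} the vertices of reassuring walks (under the conditioning those vertices are unsafe, so the safety mechanism is lost there), costing an extra $(6/\epsilon)^{2s\tau|\mathcal{W}_{1r}(H)|}$; the form of $\eta$ in \cref{eq:eq_def_eta} was tuned precisely so that the product with $(s\eta)^{|\mathcal{W}_{1r}(H)|/(2s\tau)}$ still yields $2^{-3As|\mathcal{W}_{1r}(H)|}$.
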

\begin{proof}
We only provide a high level description of the proof here. The detailed proof can be found in \Cref{subsubsec:subsubsec_centered_walks_mult_1}.

We start by upper bounding the probability that no reassuring walk is walk-safe. Then, we show that in the event that there is at least one reassuring walk that is walk-safe, the conditional expectation will be zero.

Now for the case that no reassuring walk is walk-safe, we write an equation that is similar to \cref{eq:eq_centered_non_truncated}, and upper bound each summand using the same techniques that allowed us to prove \Cref{lem:lem_UH_deg1}.

See \Cref{subsubsec:subsubsec_centered_walks_mult_1} for the details.
\end{proof}

\subsubsection{Analyzing walks of multiplicity at least 2}

\begin{lemma}
\label{lem:lem_F_W_2_total_Upper_Bound}
Let $H\in\bsaw{s}{t}$ be such that $t=K\cdot\log n$ and let $F_{\mathcal{W}_{\geq 2}}(\mathbf{x})$ be as in \cref{lem:lem_X_hat_mult_1}. If $K\leq \frac{1}{100}$ and $n$ is large enough, we have
\begin{equation}
\label{eq:eq_F_W_2_total_Upper_Bound}
\begin{aligned}
\sum_{\mathcal{W}_{\geq 2}\subseteq\mathcal{W}_{\geq 2}(H)} F_{\mathcal{W}_{\geq 2}}(\mathbf{x})&\leq  \frac{2\cdot\left(\frac{\epsilon d}{2n}\right)^{|E_1(H_{\mathcal{W}_{\geq 2}(H)})|}\cdot \left(\frac{d}{n}\right)^{|E_{\geq2}^b(H)|}}{\resizebox{0.06\textwidth}{!}{$\displaystyle\prod_{v\in\mathcal{L}_{\geq2}(H)}$} n^{\frac{1}{4}\left(d^H_{\geq 2}(v)-\Delta\right)}}\cdot\prod_{uv\in E_{\geq2}^a(H)}\left[\left(1+\frac{\epsilon \mathbf{x}_u\mathbf{x}_v}{2}\right)\frac{d}{n}+\frac{3d^2}{n\sqrt{n}}\right],
\end{aligned}
\end{equation}
where $F_{\mathcal{W}_{\geq 2}}(\mathbf{x})$ is as in \Cref{lem:lem_X_hat_mult_1}.
\end{lemma}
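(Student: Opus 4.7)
The plan is to show that the sum on the left-hand side of the stated inequality is dominated, up to a factor of $2$, by its single summand at $\mathcal{W}_{\geq 2} = \mathcal{W}_{\geq 2}(H)$, and then to apply Lemma~\ref{lem:lem_UH_deg2} to control that dominant summand. First I would apply Lemma~\ref{lem:lem_UH_deg2} to each submultigraph $H_{\mathcal{W}_{\geq 2}}$ to obtain an upper bound on $\mathbb{E}\big[\big|\tilde{\mathbf{Y}}_{E_{\geq 2}(H_{\mathcal{W}_{\geq 2}})}\big|\,\big|\,\mathbf{x}\big]$ expressed in terms of the partition $E_{\geq 2}^a(H_{\mathcal{W}_{\geq 2}}), E_{\geq 2}^b(H_{\mathcal{W}_{\geq 2}})$ and the set $\mathcal{L}_{\geq 2}(H_{\mathcal{W}_{\geq 2}})$ for each $\mathcal{W}_{\geq 2} \subseteq \mathcal{W}_{\geq 2}(H)$.

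Second, I would observe that the maximal choice $\mathcal{W}_{\geq 2} = \mathcal{W}_{\geq 2}(H)$ has a particularly clean structure: since every multiplicity-$\geq 2$ edge of $H$ lies only in walks of $\mathcal{W}_{\geq 2}(H)$ (mult-$1$ walks contain only mult-$1$ edges), we have $E_{\geq 2}\big(H_{\mathcal{W}_{\geq 2}(H)}\big) = E_{\geq 2}(H)$, $\mathcal{L}_{\geq 2}\big(H_{\mathcal{W}_{\geq 2}(H)}\big) = \mathcal{L}_{\geq 2}(H)$, and the $E_{\geq 2}^a, E_{\geq 2}^b$ partition is preserved. Plugging these equalities into the bound from the first step yields almost exactly the target bound, modulo the factor of $2$ and the replacement of $d/n$ by $3d/\sqrt{n}$ in the $E_{\geq 2}^a$ product. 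For the remaining summands $\mathcal{W}_{\geq 2} \subsetneq \mathcal{W}_{\geq 2}(H)$, I would compare $F_{\mathcal{W}_{\geq 2}}$ to $F_{\mathcal{W}_{\geq 2} \cup \{W\}}$ for any missing walk $W$: the extra $(3d/n)^s$ penalty in $F_{\mathcal{W}_{\geq 2}}$ overcomes the ``gain'' from edges of $W$ leaving $H_{\mathcal{W}_{\geq 2}}$ (mult-$1$ edges disappearing cost $(\epsilon d/(2n))^{-1}$; mult-$\geq 2$ edges demoting to mult-$1$ contribute a factor of order $\epsilon$; edges unchanged cost nothing), and since $W \in \mathcal{W}_{\geq 2}(H)$ contains at least one mult-$\geq 2$ edge of $H$, the net ratio is of order $O(d/n)$. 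Iterating and summing over the $2^{|\mathcal{W}_{\geq 2}(H)|} \leq 2^t \leq n^{1/100}$ subsets (using $K \leq 1/100$) yields $\sum_{\mathcal{W}_{\geq 2} \subsetneq \mathcal{W}_{\geq 2}(H)} F_{\mathcal{W}_{\geq 2}}(\mathbf{x}) \leq F_{\mathcal{W}_{\geq 2}(H)}(\mathbf{x})$ for $n$ large enough, producing the factor of $2$.

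The main obstacle will be the detailed bookkeeping in the walk-removal step: the changes in the $E_1$, $E_{\geq 2}^a$, $E_{\geq 2}^b$ partitions and in $\mathcal{L}_{\geq 2}(H_{\mathcal{W}_{\geq 2}})$ are non-monotone and depend on the multiplicities of $W$'s edges and their overlap with other walks. One must carefully handle the case where a mult-$\geq 2$ edge of $H$ falls to mult-$1$ in $H_{\mathcal{W}_{\geq 2}}$, shifting from the $E_{\geq 2}^a$ contribution $(1 + \epsilon\mathbf{x}_u\mathbf{x}_v/2)(d/n)+\ldots$ to the $E_1$ contribution $\epsilon d/(2n)$, as well as the case where a vertex exits $\mathcal{L}_{\geq 2}(H_{\mathcal{W}_{\geq 2}})$ after removing walks, destroying the suppression factor $n^{\frac{1}{4}(d^H_{\geq 2}(v) - \Delta)}$. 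The slack $3d^2/(n\sqrt{n})$ in place of $d^2/n^2$ in the target bound is precisely what absorbs the cumulative boundary losses across all $E_{\geq 2}^a$ edges and all subsets $\mathcal{W}_{\geq 2}$.
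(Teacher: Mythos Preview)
Your overall strategy matches the paper's: bound each $F_{\mathcal{W}_{\geq 2}}$ via Lemma~\ref{lem:lem_UH_deg2}, compare adjacent terms by adding one walk at a time, and sum over $2^t\le n^{K}$ subsets. The paper packages this through the auxiliary quantity $K_{\mathcal{W}_{\geq 2}}$ of Definition~\ref{def:def_K_W_2_Y} and Lemmas~\ref{lem:lem_F_W_2_K_W_2}--\ref{lem:lem_K_W_2_Upper_Bound}.

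There is, however, a gap in your one-step comparison. Your claim that the ratio $F_{\mathcal{W}_{\geq 2}}/F_{\mathcal{W}_{\geq 2}\cup\{W\}}$ is $O(d/n)$ relies on $W$ having a mult-$\geq 2$ edge in $H$, but that edge need not lie in $E(H_{\mathcal{W}_{\geq 2}})$: the other walks containing it may all belong to $\mathcal{W}_{\geq 2}(H)\setminus(\mathcal{W}_{\geq 2}\cup\{W\})$. In that case every edge of $W$ falls into your ``mult-$1$ disappearing'' case and the step ratio is $(6/\epsilon)^s$, not $O(d/n)$. Concretely, take $\mathcal{W}_{\geq 2}=\varnothing$ with $|\mathcal{W}_{\geq 2}(H)|\geq 2$; the first step of your iteration then has ratio exactly $(6/\epsilon)^s$. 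So a uniform per-step bound of the strength you claim is not available.

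The paper does not bound each step separately. It telescopes the per-step inequalities of Lemma~\ref{lem:lem_K_W_2_Comparison} along the full chain $\mathcal{W}_{\geq 2}=\mathcal{W}_{\geq 2}^{(0)}\subset\cdots\subset\mathcal{W}_{\geq 2}^{(k)}=\mathcal{W}_{\geq 2}(H)$ and shows that the \emph{accumulated} exponent of $9d/(2\sqrt{n})$ is at least~$1$: the last walk $W_k$ added necessarily has an edge in $E(H_{\mathcal{W}_{\geq 2}^{(k-1)}})$, since at that point all other walks of $\mathcal{W}_{\geq 2}(H)$ are already present. The $E_{\geq 2}^a$-contributions created at intermediate steps, together with the losses from vertices dropping out of $\mathcal{L}_{\geq 2}$ that you correctly flag, are absorbed into the $3d^2/(n\sqrt{n})$ slack of the target $\hat K_{\mathcal{W}_{\geq 2}(H)}$. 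The result is the \emph{global} bound $K_{\mathcal{W}_{\geq 2}}\le (9d/\sqrt{n})\,\hat K_{\mathcal{W}_{\geq 2}(H)}$ for every proper $\mathcal{W}_{\geq 2}$---a $1/\sqrt{n}$ rather than $1/n$ gain, but still enough since $2^t\cdot 9d/\sqrt{n}\to 0$.
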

\begin{proof}
We use \Cref{lem:lem_UH_deg2} and perform calculations that are similar to those in \cref{eq:eq_walk_mult_2_non_truncated_bound}. The detailed proof can be found in \Cref{subsubsec:subsubsec_centered_walks_mult_2}.
\end{proof}

\subsubsection{Proof of the upper bound for every block self-avoiding walk}

\begin{definition}
\label{def:def_upper_bound_UhatH}
For every $H\in\bsaw{s}{t}$, define the following quantity:
$$\hat{U}_H(\mathbf{x})=2n^{\frac{K}{A}}\cdot\frac{2^{|E_1^a(H)|}}{2^{As\cdot|\mathcal{W}_{1r}(H)|}} \cdot\overline{U}_H(\mathbf{x}),$$
where $\overline{U}_H(\mathbf{x})$ is as in \Cref{def:def_upper_bound_UH}.
\end{definition}

\begin{lemma}
\label{lem:lem_upper_bound_UhatH}
If $A>\max\{100K,1\}$, $K\leq\frac{1}{100}$ and $n$ is large enough, then for every $H\in\bsaw{s}{t}$, we have
$$\big|\mathbb{E}\big[\hat{\mathbf{Y}}_{H}\big|\mathbf{x}\big]\big|\leq \hat{U}_H(\mathbf{x}),$$
where $\hat{U}_H(\mathbf{x})$ is as in \Cref{def:def_upper_bound_UhatH}.
\end{lemma}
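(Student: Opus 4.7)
The proof is obtained by directly composing Lemma~\ref{lem:lem_X_hat_mult_1} with Lemma~\ref{lem:lem_F_W_2_total_Upper_Bound}, matching the resulting expression to the definition of $\overline{U}_H(\mathbf{x})$, and then verifying a combinatorial inequality on the number of \emph{disturbing} multiplicity-$1$ walks of $H$. The algebraic step is a straightforward multiplication, so the content of the lemma lies entirely in this combinatorial inequality, which is where the careful choice of $\Delta$ in \eqref{eq:eq_Delta_form} is used.

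\textbf{Algebraic reduction.} Substituting Lemma~\ref{lem:lem_F_W_2_total_Upper_Bound} into Lemma~\ref{lem:lem_X_hat_mult_1} and using the identity $|E_1(H)| = s\cdot |\mathcal{W}_1(H)| + |E_1(H_{\mathcal{W}_{\geq 2}(H)})|$ (each multiplicity-$1$ edge of $H$ belongs to exactly one walk in $\mathcal{W}(H)$, which is in $\mathcal{W}_1(H)$ or in $\mathcal{W}_{\geq 2}(H)$), the factors $(\epsilon d/(2n))^{s|\mathcal{W}_1(H)|}$ and $(\epsilon d/(2n))^{|E_1(H_{\mathcal{W}_{\geq 2}(H)})|}$ collapse into $(\epsilon d/(2n))^{|E_1(H)|}$; similarly $(d/n)^{|E_{\geq 2}^b(H)|}$ multiplied by the $E_{\geq 2}^a(H)$ product with each factor $[\,\cdot\,](d/n)$ collapses into $(d/n)^{|E_{\geq 2}(H)|}$ times the usual $\overline{U}_H$ bracket. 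One then recognizes the result as a scalar multiple of $\overline{U}_H(\mathbf{x})$:
\begin{equation*}
  \big|\mathbb{E}\big[\hat{\mathbf{Y}}_H \mid \mathbf{x}\big]\big|
  \;\le\; \frac{2\cdot 2^{|\mathcal{W}_1(H)|}}{2^{3As\cdot|\mathcal{W}_{1r}(H)|}}\cdot \overline{U}_H(\mathbf{x}).
\end{equation*}
Comparing with $\hat{U}_H(\mathbf{x}) = 2n^{K/A}\cdot 2^{|E_1^a(H)|}\cdot 2^{-As|\mathcal{W}_{1r}(H)|}\cdot \overline{U}_H(\mathbf{x})$, splitting $|\mathcal{W}_1(H)| = |\mathcal{W}_{1r}(H)|+|\mathcal{W}_{1d}(H)|$, and using $2As-1 \ge 1$, the inequality to prove reduces to
\begin{equation*}
  |\mathcal{W}_{1d}(H)| \;\le\; \tfrac{K}{A\log 2}\log n + |E_1^a(H)|
  \;=\; \tfrac{t}{A\log 2} + |E_1^a(H)|.
\end{equation*}

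\textbf{Combinatorial inequality.} This is the main obstacle, and it will be proved by splitting disturbing walks according to the type of ``bad'' vertex they contain. By definition every $W\in\mathcal{W}_{1d}(H)$ visits a vertex in $\mathcal{L}_1(H) \cup \mathcal{I}_{\geq 2}(H) \cup \mathcal{L}_{\geq 2}(H)$. First, if $W$ passes through some $v\in\mathcal{L}_1(H)$, then one of its $s$ edges at $v$ is annoying (any multiplicity-$1$ edge incident to $\mathcal{L}_1(H)$ is annoying); since edges in $E_1(H)$ have multiplicity $1$, they belong to a unique walk in $\mathcal{W}_1(H)$, and the map assigning each such $W$ to one of its annoying edges is injective, producing at most $|E_1^a(H)|$ walks. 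Obtaining coefficient exactly $1$ here (rather than the coefficient $2$ one gets by bounding $\sum_{v\in\mathcal{L}_1(H)}d_1^H(v)$) is essential, because $|E_1^a(H)|$ can be as large as $st$. Second, the remaining disturbing walks have their bad vertex $v$ in $\mathcal{S}_1(H)\cap(\mathcal{I}_{\geq 2}(H)\cup\mathcal{L}_{\geq 2}(H))$, so $d_1^H(v)\le\tau$ and $d_{\geq 2}^H(v)>\Delta/4$; since the number of such vertices is at most $8|E_{\geq 2}(H)|/\Delta \le 4st/\Delta$ and each admits at most $\tau$ multiplicity-$1$ walks through it (edges of $E_1(H)$ being walk-disjoint), the total contribution is at most $4st\tau/\Delta$. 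Combining,
\begin{equation*}
  |\mathcal{W}_{1d}(H)| \;\le\; |E_1^a(H)| + \frac{4st\tau}{\Delta}.
\end{equation*}
Finally, the definition \eqref{eq:eq_Delta_form} of $\Delta$ gives $\Delta \ge 12A\tau s^2 \log 2$, hence $4st\tau/\Delta \le t/(3As\log 2) \le t/(A\log 2)$, which closes the required inequality and therefore the proof.
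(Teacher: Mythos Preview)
Your proof is correct and follows essentially the same approach as the paper: you combine Lemma~\ref{lem:lem_X_hat_mult_1} with Lemma~\ref{lem:lem_F_W_2_total_Upper_Bound} to obtain the factor $2\cdot 2^{|\mathcal{W}_1(H)|}/2^{3As|\mathcal{W}_{1r}(H)|}$ in front of $\overline{U}_H(\mathbf{x})$, and then reduce to the combinatorial bound $|\mathcal{W}_{1d}(H)|\le |E_1^a(H)|+t/(A\log 2)$ via the same two-case split on the disturbing vertex. The only cosmetic difference is that the paper phrases the second case as $|\mathcal{W}_{1d}(H)|\le |E_1^a(H)|+|E_1^d(H)|$ and then invokes the pre-existing Lemma~\ref{lem:lem_UH_deg1_common_E_1d} (using the $8A^2\tau^2 s^2(\log(6/\epsilon))^2$ term in the definition of $\Delta$) to bound $2^{|E_1^d(H)|}\le n^{K/A}$, whereas you re-derive the vertex count from scratch and use the $12A\tau s^2\log 2$ term instead---both routes are valid and yield the required inequality.
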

\begin{proof}
From \Cref{lem:lem_X_hat_mult_1} and \Cref{lem:lem_F_W_2_total_Upper_Bound}, we have
\begin{align*}
&\big|\mathbb{E}\big[\hat{\mathbf{Y}}_{H}\big|\mathbf{x}\big]\big|\\
&\quad\leq \frac{2^{|\mathcal{W}_1(H)|+1}}{2^{3As\cdot|\mathcal{W}_{1r}(H)|}} \cdot\frac{n^{\frac{2K}{A}}\cdot\left(\frac{6}{\epsilon}\right)^{|E_1^a(H)|}\cdot \left(\frac{\epsilon d}{2n}\right)^{s|\mathcal{W}_1(H)|+|E_1(H_{\mathcal{W}_{\geq 2}(H)})|}\cdot \left(\frac{d}{n}\right)^{|E_{\geq2}^b(H)|}}{\resizebox{0.06\textwidth}{!}{$\displaystyle\prod_{v\in\mathcal{L}_{\geq2}(H)}$} n^{\frac{1}{4}\left(d^H_{\geq 2}(v)-\Delta\right)}}\\
&\quad\quad\quad\quad\quad\quad\quad\quad\quad\quad\quad\quad\quad\quad\quad\quad\quad
\quad\quad\quad\quad\quad\quad\quad\quad\times\prod_{uv\in E_{\geq2}^a(H)}\left[\left(1+\frac{\epsilon \mathbf{x}_u\mathbf{x}_v}{2}\right)\frac{d}{n}+\frac{3d^2}{n\sqrt{n}}\right]\\
&\quad= \frac{2^{|\mathcal{W}_1(H)|+1}}{2^{3As\cdot|\mathcal{W}_{1r}(H)|}} \cdot\frac{n^{\frac{2K}{A}}\cdot\left(\frac{6}{\epsilon}\right)^{|E_1^a(H)|}\cdot \left(\frac{\epsilon d}{2n}\right)^{|E_1(H)|}\cdot \left(\frac{d}{n}\right)^{|E_{\geq2}^b(H)|}}{\resizebox{0.06\textwidth}{!}{$\displaystyle\prod_{v\in\mathcal{L}_{\geq2}(H)}$} n^{\frac{1}{4}\left(d^H_{\geq 2}(v)-\Delta\right)}}\cdot\prod_{uv\in E_{\geq2}^a(H)}\left[\left(1+\frac{\epsilon \mathbf{x}_u\mathbf{x}_v}{2}\right)\frac{d}{n}+\frac{3d^2}{n\sqrt{n}}\right]\\
&\quad= \frac{2^{|\mathcal{W}_1(H)|+1}}{2^{3As\cdot|\mathcal{W}_{1r}(H)|}}\cdot \overline{U}_H(\mathbf{x}).
\end{align*}

Now since $A>\max\{100K,1\}$ and $s\geq 1$, we have $$3As\cdot|\mathcal{W}_{1r}(H)|\geq As\cdot|\mathcal{W}_{1r}(H)|+|\mathcal{W}_{1r}(H)|,$$ hence
\begin{align*}
\big|\mathbb{E}\big[\hat{\mathbf{Y}}_{H}\big|\mathbf{x}\big]\big|&\leq \frac{2^{|\mathcal{W}_{1r}(H)|+|\mathcal{W}_{1d}(H)|+1}}{2^{As\cdot|\mathcal{W}_{1r}(H)|+|\mathcal{W}_{1r}(H)|}} \cdot\overline{U}_H(\mathbf{x})\\
&= 2\cdot \frac{2^{|\mathcal{W}_{1d}(H)|}}{2^{As\cdot|\mathcal{W}_{1r}(H)|}} \cdot\overline{U}_H(\mathbf{x}).
\end{align*}

Observe that for every $W\in \mathcal{W}_{1d}(H)$, there exists at least one edge $e\in W$ such that $e\in E_1^a(H)$ or $e\in E_1^d(H)$, where $E_1^d(H)$ is as in \cref{eq:eq_E_1c_H}. Therefore,
$$|\mathcal{W}_{1d}(H)|\leq |E_1^a(H)|+|E_1^d(H)|,$$
which implies that
\begin{align*}
2^{|\mathcal{W}_{1d}(H)|}&\leq 2^{|E_1^a(H)|}\cdot 2^{|E_1^d(H)|}\leq 2^{|E_1^a(H)|}\cdot \left(\frac{6}{\epsilon}\right)^{|E_1^d(H)|}\stackrel{(\ast)}{\leq} 2^{|E_1^a(H)|}\cdot n^{\frac{K}{A}},
\end{align*}
where $(\ast)$ follows from \Cref{lem:lem_UH_deg1_common_E_1d}. We conclude that
\begin{align*}
\big|\mathbb{E}\big[\hat{\mathbf{Y}}_{H}\big|\mathbf{x}\big]\big|&\leq 2n^{\frac{K}{A}}\cdot\frac{2^{|E_1^a(H)|}}{2^{As\cdot|\mathcal{W}_{1r}(H)|}} \cdot\overline{U}_H(\mathbf{x})= \hat{U}_H(\mathbf{x}).
\end{align*}
\end{proof}

\section{Proofs of technical lemmas for the trace bounds}\label{sec:technical-lemmas-trace-bounds}

\subsection{Proofs of technical lemmas for the non-centered matrix}

\subsubsection{Upper bound on the probability of having unsafe vertices}
\label{subsubsec:subsubsec_lem_bound_prob_large_outside_deg}

\begin{proof}[Proof of \Cref{lem:lem_bound_prob_large_outside_deg}]

By the union bound, we have
\begin{align*}
\mathbb{P}&\left[d_{\mathbf{G}-G(H)}^o(v)>\frac{\Delta}{4}\middle|\mathbf{x}\right]\\
&\leq \sum_{\substack{S\subseteq [n]\setminus V(H):\\|S|=\lceil\Delta/4\rceil}}\mathbb{P}\big[\big\{\forall u\in S, uv\in\mathbf{G}\}\big|\mathbf{x}\big]= \sum_{\substack{S\subseteq [n]\setminus V(H):\\|S|=\lceil\Delta/4\rceil}}\prod_{u\in S} \left[\left(1+\frac{\epsilon}{2}\mathbf{x}_{uv}\right)\frac{d}{n}\right]\\
&\leq \sum_{\substack{S\subseteq [n]\setminus V(H):\\|S|=\lceil\Delta/4\rceil}}\left(\frac{2d}{n}\right)^{|S|}={n-|V(H)|\choose \lceil\Delta/4\rceil}\left(\frac{2d}{n}\right)^{\lceil\Delta/4\rceil}\leq {n\choose \lceil\Delta/4\rceil}\left(\frac{2d}{n}\right)^{\lceil\Delta/4\rceil} \\
&\leq \frac{n^{\lceil\Delta/4\rceil}(2d)^{\lceil\Delta/4\rceil}}{\lceil\Delta/4\rceil!n^{\lceil\Delta/4\rceil}}
\leq \frac{(2d)^{\lceil\Delta/4\rceil}}{(\lceil\Delta/4\rceil/2)^{\lceil\Delta/4\rceil/2}}\leq \frac{(2d)^{2\lceil\Delta/4\rceil}}{(\lceil\Delta/4\rceil/2)^{\lceil\Delta/4\rceil/2}}.
\end{align*}

Now from \cref{eq:eq_Delta_form}, we have $\Delta\geq 128e^4d^4$ and so $\lceil\Delta/4\rceil/2\geq 16e^4d^4$. Therefore,

\begin{align*}
\mathbb{P}\left[d_{\mathbf{G}-G(H)}^o(v)>\frac{\Delta}{2}\middle|\mathbf{x}\right]&\leq \frac{(2d)^{2\lceil\Delta/4\rceil}}{(16e^4d^4)^{\lceil\Delta/4\rceil/2}}=\frac{(2d)^{2\lceil\Delta/4\rceil}}{(2ed)^{2\lceil\Delta/2\rceil}}=e^{-2\lceil\Delta/4\rceil}\leq e^{-\Delta/2}.
\end{align*}

From \cref{eq:eq_Delta_form}, we also have 
\begin{align*}
\frac\Delta2&\geq \log(2As)+ 6A\tau s^2\cdot \log 2 + 4A^2\tau^2s^2 \left(\log\frac{6}{\epsilon}\right)^2\\
&\geq \log (2As)+ 6A\tau s^2\cdot \log 2  + 4\tau^2s^2 \left(\log\frac{6}{\epsilon}\right)\\
&=\log\left[2As\cdot 2^{6A\tau s^2}\left(\frac{6}{\epsilon}\right)^{4\tau^2s^2}\right].
\end{align*}

Therefore,
\begin{align*}
\mathbb{P}\left[d_{\mathbf{G}-G(H)}^o(v)>\frac{\Delta}{2}\middle|\mathbf{x}\right]&\leq \frac{1}{2As\cdot 2^{6A\tau s^2}}\left(\frac{\epsilon}{6} \right)^{4\tau^2s^2}.
\end{align*}
\end{proof}

\begin{proof}[Proof of \Cref{lem:lem_bound_prob_completely_crossing}]
If $S$ is completely crossing, then for every vertex in $S$, we have at least one $H$-cross-edge that is present in $\mathbf{G}$, and which is incident to it. Therefore, we have at least $\left\lceil|S|/2\right\rceil$ $H$-cross-edges that are present in $\mathbf{G}$. Since we have at most $|V(H)|^2\leq s^2t^2$ $H$-cross-edges, the number of collections of $H$-cross-edges of size $\left\lceil|S|/2\right\rceil$ is at most
$${s^2t^2\choose \left\lceil|S|/2\right\rceil}\leq (s^2t^2)^{\left\lceil|S|/2\right\rceil}.$$ 

Given $\mathbf{x}$, the conditional probability that any particular edge is present in $\mathbf{G}$ is at most
$$\left(1+\frac{\epsilon}{2}\right)\frac{d}{n}\leq \frac{2d}{n}.$$

Therefore, given $\mathbf{x}$, the conditional probability that $S$ is completely crossing can be upper bounded by:
\begin{align*}
\mathbb{P}\left[\left\{S\text{ is completely }H\text{-crossing in }\mathbf{G}\right\}\middle|\mathbf{x}\right]&\leq (s^2t^2)^{\left\lceil|S|/2\right\rceil} \left(\frac{2d}{n}\right)^{\left\lceil|S|/2\right\rceil}\\
&= \left(\frac{2d s^2 t^2}{n}\right)^{\left\lceil|S|/2\right\rceil}\leq \left(\frac{2d s^2 t^2}{n}\right)^{|S|/2},
\end{align*}
where the last inequality is true for $n$ large enough.
\end{proof}

\begin{proof}[Proof of \Cref{lem:lem_bound_prob_completely_unsafe}]
First notice that if $v\in \mathcal{S}_1(H)\cap\mathcal{S}_{\geq 2}(H)$, then
$$d^H_{1}(v)\leq \tau \leq \frac{1}{4}\cdot\left[\log(2As)+ 12A\tau s^2\cdot \log 2 + 8A^2\tau^2s^2 \left(\log\frac{6}{\epsilon}\right)^2\right]\leq\frac{\Delta}{4},$$
and
$$d^H_{\geq 2}(v)\leq \tau \leq\frac{\Delta}{4}.$$
Therefore,
$$d_{H}(v)=d^H_{1}(v)+d^H_{\geq 2}(v)\leq \frac{\Delta}{2},$$
which implies that
\begin{align*}
\mathbb{P}\left[\left\{V\text{ is completely unsafe}\right\}\middle|\mathbf{x}\right]&=\mathbb{P}\left[\left\{\forall v\in V, d_{\mathbf{G}-G(H)}(v)> \Delta-d_{G(H)}(v)\right\}\middle|\mathbf{x}\right]\\
&\leq\mathbb{P}\left[\left\{\forall v\in V, d_{\mathbf{G}-G(H)}(v)> \Delta-\frac{\Delta}{2}\right\}\middle|\mathbf{x}\right]\\
&=\mathbb{P}\left[\left\{\forall v\in V, d_{\mathbf{G}-G(H)}(v)> \frac{\Delta}{2}\right\}\middle|\mathbf{x}\right].
\end{align*}

Now for every $S\subseteq V$, define the events
$$\mathcal{E}_{S,H,c\text{-}cf}=\Big\{S\text{ is completely }H\text{-cross-free in }\mathbf{G}\Big\},$$
and
$$\mathcal{E}_{S,H,c\text{-}c}=\Big\{S\text{ is completely }H\text{-crossing in }\mathbf{G}\Big\}.$$
We have:
\begin{align*}
&\mathbb{P}\left[\left\{V\text{ is completely unsafe}\right\}\middle|\mathbf{x}\right]\\
&\quad\leq\mathbb{P}\left[\left\{\forall v\in V, d_{\mathbf{G}-G(H)}(v)> \frac{\Delta}{2}\right\}\middle|\mathbf{x}\right]\\
&\quad=\sum_{S\subseteq V}\mathbb{P}\left[\left\{\forall v\in V, d_{\mathbf{G}-G(H)}(v)> \frac{\Delta}{2}\right\}\middle|\mathbf{x}, \mathcal{E}_{S,H,c\text{-}cf}\cap \mathcal{E}_{V\setminus S,H,c\text{-}c}\right]\mathbb{P}\left[\mathcal{E}_{S,H,c\text{-}cf}\cap \mathcal{E}_{V\setminus S,H,c\text{-}c}\middle|\mathbf{x}\right]\\
&\quad\leq \sum_{S\subseteq V}\mathbb{P}\left[\left\{\forall v\in S, d_{\mathbf{G}-G(H)}(v)> \frac{\Delta}{2}\right\}\middle| \mathbf{x},\mathcal{E}_{S,H,c\text{-}cf}\cap \mathcal{E}_{V\setminus S,H,c\text{-}c}\right]\mathbb{P}\left[\mathcal{E}_{V\setminus S,H,c\text{-}c}\middle|\mathbf{x}\right]\\
&\quad\leq \sum_{S\subseteq V}\mathbb{P}\left[\left\{\forall v\in S, d_{\mathbf{G}-G(H)}^o(v)> \frac{\Delta}{2}\right\}\middle| \mathbf{x},\mathcal{E}_{S,H,c\text{-}cf}\cap \mathcal{E}_{V\setminus S,H,c\text{-}c}\right]\left(\frac{2d s^2 t^2}{n}\right)^{(|V|-|S|)/2},
\end{align*}
where the last inequality follows from \Cref{lem:lem_bound_prob_completely_crossing} and the fact that if $v$ is cross-free, then $d_{\mathbf{G}-G(H)}^i(v)=0$ and so $d_{\mathbf{G}-G(H)}(v)=d_{\mathbf{G}-G(H)}^o(v)$.

Now since $(d_{\mathbf{G}-G(H)}^o(v))_{v\in S}$ are conditionally mutually independent given $\mathbf{x}$, and since they are conditionally independent from $\mathcal{E}_{S,H,c\text{-}cf}\cap \mathcal{E}_{V\setminus S,H,c\text{-}c}$ given $\mathbf{x}$, we have
\begin{align*}
\mathbb{P}\left[\left\{V\text{ is completely unsafe}\right\}\middle|\mathbf{x}\right]\leq \sum_{S\subseteq V}\left(\prod_{v\in S}\mathbb{P}\left[d_{\mathbf{G}-G(H)}^o(v)> \frac{\Delta}{2}\middle|\mathbf{x}\right]\right)\left(\frac{2d s^2 t^2}{n}\right)^{(|V|-|S|)/2}.
\end{align*}
Now from \Cref{lem:lem_bound_prob_large_outside_deg} we get
\begin{align*}
\mathbb{P}\left[\left\{V\text{ is completely unsafe}\right\}\middle|\mathbf{x}\right]&\leq \sum_{S\subseteq V}\left(\frac{\eta}{2}\right)^{|S|}\left(st\sqrt{\frac{2d }{n}}\right)^{|V|-|S|}= \left(\frac{\eta}{2}+st\sqrt{\frac{2d }{n}}\right)^{|V|}\leq \eta^{|V|},
\end{align*}
where the last inequality is true for $n$ large enough.
\end{proof}

\subsubsection{Upper bound on the contribution of edges of multiplicity 1}

\label{subsubsec:subsubsec_edges_mult_1_upper}

In order to prove \Cref{lem:lem_UH_deg1}, we need a few lemmas.

The following lemma proves a result that is similar to \Cref{lem:lem_UH_deg1}, but instead of using the best upper bound for all the edges in $E_1^b(H)$, we use loose upper bounds for the edges that are incident to $\big(\mathcal{S}_1(H)\cap\mathcal{S}_{\geq 2}(H)\big)\setminus U$, where $U$ is some subset of $\mathcal{S}_1(H)\cap\mathcal{S}_{\geq 2}(H)$. This lemma will be useful later in situations where we cannot guarantee that these edges are likely to be safe.

\begin{lemma}
\label{lem:lem_UH_deg1_common}
Let $H$ be a multigraph such that $|V(H)|\leq st=sK\log n$. Recall \Cref{def:def_deg1_classification}, \Cref{def:def_deg2_classification} and \Cref{def:def_E1_E2_annoying}, and let $E_1^b(H)$ and $E_1^d(H)$ be as in \cref{eq:eq_E_1b_H} and \cref{eq:eq_E_1c_H}, respectively.

Let $\mathcal{S}(H)=\mathcal{S}_1(H)\cap \mathcal{S}_{\geq 2}(H)$ and let $U\subseteq \mathcal{S}(H)$. Let $\mathcal{E}$ be an event such that:
\begin{itemize}
\item[(a)] The event $\mathcal{E}$ depends only on $\mathbf{x}$ and $\mathbf{G}-E_1(H)$, i.e., the event $\mathcal{E}$ is $\sigma(\mathbf{x},\mathbf{G}-E_1(H))$-measurable. In other words, if we condition on $\mathbf{x}$, then $\mathcal{E}$ depends only $(\mathbbm{1}_{uv\in\mathbf{G}})_{u,v\in [n]:\; uv\notin E_1(H)}$. This implies that given $\mathbf{x}$, the event $\mathcal{E}$ is conditionally independent from $(\mathbbm{1}_{uv\in\mathbf{G}})_{uv\in E_1(H)}$.
\item[(b)] For every $V\subseteq U$, we have
$$\mathbb{P}\big[\big\{V\text{ is completely unsafe}\big\}\big|\mathbf{x},\mathcal{E}\big]\leq\eta^{|V|}.$$
\item[(c)] If we are given $\mathbf{x}$ and $\mathcal{E}$, then for every $V\subseteq U$, the event
$$\big\{V\text{ is completely safe}\big\}\cap\big\{U\setminus V\text{ is completely unsafe}\big\},$$
is conditionally independent from $(\mathbbm{1}_{\{uv\in \mathbf{G}\}})_{uv\in E(H)}$.
\end{itemize}
Then,
$$\big|\mathbb{E}\big[\overline{\mathbf{Y}}_H\big|\mathbf{x},\mathcal{E}\big]\big|\leq n^{\frac{K}{A}}\left(\frac{6}{\epsilon}\right)^{|E_1^a(H)|+|E_1^d(H)|+\tau(|\mathcal{S}(H)|-|U|)}\cdot\left(\frac{\epsilon d}{2n}\right)^{|E_1(H)|} \cdot\mathbb{E}\big[|\tilde{\mathbf{Y}}_{\geq 2}^H|\big|\mathbf{x},\mathcal{E}\big],$$
where $\tilde{\mathbf{Y}}_{\geq 2}^H$ is as in \Cref{def:def_tilde_X_E_2}.
\end{lemma}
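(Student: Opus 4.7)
The strategy is to decompose $E_1(H)$ into the three disjoint pieces $E_1^a(H)$, $E_1^b(H)$, $E_1^d(H)$ from \cref{def:def_E1_E2_annoying}, then condition on which vertices of $U$ turn out to be safe, and apply \cref{lem:lem_expectation_safe_edge} only to those edges of $E_1^b(H)$ whose \emph{both} endpoints lie in the safe subset. Concretely, I will write
\begin{equation*}
\mathbb{E}\big[\overline{\mathbf{Y}}_H\bigmid\mathbf{x},\mathcal{E}\big]
= \sum_{V\subseteq U}
\mathbb{E}\big[\overline{\mathbf{Y}}_H\bigmid\mathbf{x},\mathcal{E},\mathcal{F}_V\big]\cdot
\mathbb{P}\big[\mathcal{F}_V\bigmid\mathbf{x},\mathcal{E}\big],
\end{equation*}
where $\mathcal{F}_V$ is the event ``$V$ is completely safe and $U\setminus V$ is completely unsafe''. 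Condition (c) guarantees that, once we fix $\mathcal{F}_V$, the product over edges of $H$ factors nicely, and condition (a) lets us invoke \cref{lem:lem_expectation_safe_edge} to replace every $\overline{\mathbf{Y}}_{uv}$ incident to $V$-safe vertices by~$\mathbf{Y}_{uv}$.

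The edges are then handled separately according to which kind of bound we can afford. Edges in $E_1^a(H)\cup E_1^d(H)$, as well as every edge in $E_1^b(H)$ touching $(\mathcal{S}(H)\setminus V)$ at least at one endpoint, are estimated by the crude bound $|\overline{\mathbf{Y}}_{uv}|\leq|\mathbf{Y}_{uv}|$ followed by $\mathbb{E}[|\mathbf{Y}_{uv}|\mid\mathbf{x}]\leq 3d/n = (6/\epsilon)\cdot(\epsilon d/2n)$. Every edge in $E_1^b(H)$ with both endpoints in $V\cup(\mathcal{S}(H)\setminus U)^c \cap V$ —i.e., genuinely $V$-safe on both sides— contributes exactly $|\mathbb{E}[\mathbf{Y}_{uv}\mid\mathbf{x}]|=\epsilon d/(2n)$ by \cref{lem:lem_expectation_safe_edge}. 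Edges of multiplicity at least $2$ are bundled into $\mathbb{E}[|\tilde{\mathbf{Y}}_{\geq 2}^H|\mid\mathbf{x},\mathcal{E}]$ exactly as in the proof of \cref{lem:lem_UH_deg1}. Crucially, a vertex in $\mathcal{S}(H)\setminus U$ is $1$-small, hence meets at most $\tau$ edges of $E_1(H)$; consequently the total number of multiplicity-$1$ edges to which we apply the crude bound because of such vertices is at most $\tau\,(|\mathcal{S}(H)|-|U|)$, producing the factor $(6/\epsilon)^{\tau(|\mathcal{S}(H)|-|U|)}$ in the statement.

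Combining these bounds for a fixed $V$ yields
\begin{equation*}
\bigl|\mathbb{E}\big[\overline{\mathbf{Y}}_H\bigmid\mathbf{x},\mathcal{E},\mathcal{F}_V\big]\bigr|
\leq
\left(\tfrac{6}{\epsilon}\right)^{|E_1^a(H)|+|E_1^d(H)|+\tau(|\mathcal{S}(H)|-|U|)}
\left(\tfrac{6}{\epsilon}\right)^{c(V)}
\left(\tfrac{\epsilon d}{2n}\right)^{|E_1(H)|}
\mathbb{E}\big[|\tilde{\mathbf{Y}}_{\geq 2}^H|\bigmid\mathbf{x},\mathcal{E},\mathcal{F}_V\big],
\end{equation*}
where $c(V)$ counts the additional edges in $E_1^b(H)$ forced to the crude bound because of an endpoint in $U\setminus V$; by $1$-smallness one has $c(V)\leq \tau\,|U\setminus V|$. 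Summing over $V\subseteq U$ and applying hypothesis (b) then gives
\begin{equation*}
\sum_{V\subseteq U}\left(\tfrac{6}{\epsilon}\right)^{c(V)}\mathbb{P}[\mathcal{F}_V\mid\mathbf{x},\mathcal{E}]
\leq \sum_{j=0}^{|U|}\binom{|U|}{j}\left(\tfrac{6}{\epsilon}\right)^{\tau j}\eta^{j}
= \bigl(1+(6/\epsilon)^{\tau}\eta\bigr)^{|U|},
\end{equation*}
and the choice of $\eta$ in \cref{eq:eq_def_eta} ensures $(6/\epsilon)^\tau\eta\leq 1/(As)$, so the right-hand side is at most $\exp(|U|/(As))\leq \exp(t/A)\leq n^{K/A}$. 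Averaging (using condition (c) so that the $\tilde{\mathbf{Y}}_{\geq 2}^H$-expectation can be pulled out of the $V$-sum) yields the claimed inequality.

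\textbf{Main obstacle.} The only delicate point is the bookkeeping at the $\mathcal{F}_V$-step: making sure that conditioning on $\mathcal{F}_V$ does not destroy the conditional independence that \cref{lem:lem_expectation_safe_edge} requires, and that the factor absorbed into $\mathbb{E}[|\tilde{\mathbf{Y}}_{\geq 2}^H|\mid\mathbf{x},\mathcal{E}]$ can indeed be replaced by $\mathbb{E}[|\tilde{\mathbf{Y}}_{\geq 2}^H|\mid\mathbf{x},\mathcal{E}]$ after summation. This is where hypotheses (a) and (c) do the real work: (a) lets us apply \cref{lem:lem_expectation_safe_edge} edgewise because $\mathcal{E}$ is $\sigma(\mathbf{x},\mathbf{G}-E_1(H))$-measurable, and (c) lets us factor $\mathbb{P}[\mathcal{F}_V\mid\mathbf{x},\mathcal{E}]$ out of the edge expectations without touching the $E_{\geq 2}(H)$ contribution. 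Once this is set up cleanly, the remaining calculation is a geometric-sum bookkeeping governed by the inequality $(6/\epsilon)^\tau\eta\leq 1/(As)$ built into the definition of $\eta$.
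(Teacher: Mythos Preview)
Your proposal is correct and follows essentially the same approach as the paper: decompose over the safe/unsafe partition $\mathcal{F}_V$ of $U$, use \cref{lem:lem_expectation_safe_edge} on the deterministic set $S_V\subseteq E_1^b(H)$ of edges with both endpoints in $V$ (so $|E_1^b(H)\setminus S_V|\le \tau|\mathcal{S}(H)\setminus V|=\tau(|\mathcal{S}(H)|-|U|)+\tau|U\setminus V|$), apply the crude $3d/n$ bound on the rest, and sum the resulting $(1+(6/\epsilon)^\tau\eta)^{|U|}\le n^{K/A}$ using (b). The only clarification worth adding is that the paper makes explicit that $S_V$ is chosen deterministically (depending only on $H$ and $V$), and that condition (c) is invoked precisely to remove the $\mathcal{F}_V$-conditioning from the factor $\mathbb{E}[|\mathbf{Y}_{E_1(H)\setminus S_V}|\cdot|\tilde{\mathbf{Y}}_{\geq 2}^H|\,|\,\mathbf{x},\mathcal{E},\mathcal{F}_V]$ before using (a) to split it into $\mathbb{E}[|\mathbf{Y}_{E_1(H)\setminus S_V}|\,|\,\mathbf{x}]\cdot\mathbb{E}[|\tilde{\mathbf{Y}}_{\geq 2}^H|\,|\,\mathbf{x},\mathcal{E}]$.
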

\begin{proof}
It is easy to see that $\{E_1^a(H),E_1^b(H),E_1^d(H)\}$ is a partition of $E_1(H)$.

For every $V\subseteq U$, define the events
$$\mathcal{E}_{V,H,c\text{-}s}=\Big\{V\text{ is completely }H\text{-safe in }\mathbf{G}\Big\},$$
and
$$\mathcal{E}_{V,H,c\text{-}us}=\Big\{V\text{ is completely }H\text{-unsafe in }\mathbf{G}\Big\}.$$

We have:
\begin{align*}
\mathbb{E}\big[\overline{\mathbf{Y}}_H\big|\mathbf{x},\mathcal{E}\big]&=\sum_{V\subseteq  U }\mathbb{E}\big[\overline{\mathbf{Y}}_H\big|\mathbf{x},\mathcal{E}\cap\mathcal{E}_{V,H,c\text{-}s}\cap \mathcal{E}_{ U \setminus V,H,c\text{-}us}\big]\cdot\mathbb{P}\left[\mathcal{E}_{V,H,c\text{-}s}\cap \mathcal{E}_{ U \setminus V,H,c\text{-}us}\middle|\mathbf{x},\mathcal{E}\right].
\end{align*}

Now let $V\subseteq  U $ and suppose that $\mathcal{E}_{V,H,c\text{-}s}\cap \mathcal{E}_{ U \setminus V,H,c\text{-}us}$ occurs, i.e., $V$ is completely safe and $ U \setminus V$ is completely unsafe. Since $\mathcal{S}(H)\setminus V\subseteq \mathcal{S}(H)\subseteq\mathcal{S}_1(H)$, there are at most $\tau\cdot|\mathcal{S}(H)\setminus V|$ edges of multiplicity 1 in $H$ that are incident to vertices in $\mathcal{S}(H)\setminus V$. In particular, there are at most $\tau\cdot|\mathcal{S}(H)\setminus V|$ edges in $E_1^b(H)$ which are incident to vertices in $\mathcal{S}(H)\setminus V$.

Since every edge in $E_1^b(H)$ has both its ends in $\mathcal{S}(H)$, there are at least $|E_1^b(H)|-\tau\cdot|\mathcal{S}(H)\setminus V|$ edges in $E_1^b(H)$ which have both their end-vertices in $V$. This means that there are at least $|E_1^b(H)|-\tau\cdot|\mathcal{S}(H)\setminus V|$ safe edges in $E_1^b(H)$. For every $V\subseteq  U $, let $S_V$ be an arbitrary subset of $E_1^b(H)$ containing exactly $\max\{0,|E_1^b(H)|-\tau\cdot|\mathcal{S}(H)\setminus V|\}$ safe edges. Note that the choice of $S_V$ depends only on the structure of $H$, and does not depend on the random sample $(\mathbf{G},\mathbf{x})$.

Property (a) implies that we can apply \Cref{lem:lem_expectation_safe_edge} to the event $\mathcal{E}\cap \mathcal{E}_{V,H,c\text{-}s}\cap \mathcal{E}_{ U \setminus V,H,c\text{-}us}$ and the edges in $S_V$. We get:
\begin{align*}
\mathbb{E}\big[\overline{\mathbf{Y}}_H\big|\mathbf{x},\mathcal{E}\cap \mathcal{E}_{V,H,c\text{-}s}&\cap \mathcal{E}_{ U \setminus V,H,c\text{-}us}\big]\\
&=\left(\prod_{uv\in S_V}\mathbb{E}[\mathbf{Y}_{uv}|\mathbf{x}]\right)\cdot \mathbb{E}\big[\overline{\mathbf{Y}}_{H-S_V}\big|\mathbf{x},\mathcal{E}\cap\mathcal{E}_{V,H,c\text{-}s}\cap \mathcal{E}_{ U \setminus V,H,c\text{-}us}\big]\\
&=\left(\prod_{uv\in S_V}\frac{\epsilon \mathbf{x}_u\mathbf{x}_v d}{2n}\right)\cdot \mathbb{E}\big[\overline{\mathbf{Y}}_{H-S_V}\big|\mathbf{x},\mathcal{E}\cap\mathcal{E}_{V,H,c\text{-}s}\cap \mathcal{E}_{ U \setminus V,H,c\text{-}us}\big].
\end{align*}

Therefore,

\begin{equation}
\label{eq:eq_lem_UH_deg1_common_eq1}
\begin{aligned}
\big|\mathbb{E}\big[\overline{\mathbf{Y}}_H\big|\mathbf{x},\mathcal{E}\big]\big|&\leq \sum_{V\subseteq  U }\left(\frac{\epsilon  d}{2n}\right)^{|S_V|}\left|\mathbb{E}\big[\overline{\mathbf{Y}}_{H-S_V}\big|\mathbf{x},\mathcal{E}\cap\mathcal{E}_{V,H,c\text{-}s}\cap \mathcal{E}_{ U \setminus V,H,c\text{-}us}\big]\right|\\
&\quad\quad\quad\quad\quad\quad\quad\quad\quad\quad\quad
\quad\quad\quad\quad\quad\times\mathbb{P}\left[\mathcal{E}_{V,H,c\text{-}s}\cap 
\mathcal{E}_{ U \setminus V,H,c\text{-}us}\middle|\mathbf{x},\mathcal{E}\right]\\
&\leq \sum_{V\subseteq  U }\left(\frac{\epsilon d}{2n}\right)^{|S_V|}\mathbb{E}\big[|\overline{\mathbf{Y}}_{H-S_V}|\big|\mathbf{x},\mathcal{E}\cap\mathcal{E}_{V,H,c\text{-}s}\cap \mathcal{E}_{ U \setminus V,H,c\text{-}us}\big]\\
&\quad\quad\quad\quad\quad\quad\quad\quad\quad\quad\quad\quad\quad\quad
\quad\quad\quad\quad\quad\times\mathbb{P}\left[\mathcal{E}_{ U \setminus V,H,c\text{-}us}\middle|\mathbf{x},\mathcal{E}\right]\\
&\leq \sum_{V\subseteq  U }\left(\frac{\epsilon d}{2n}\right)^{|S_V|}\mathbb{E}\big[|\overline{\mathbf{Y}}_{H-S_V}|\big|\mathbf{x},\mathcal{E}\cap\mathcal{E}_{V,H,c\text{-}s}\cap \mathcal{E}_{ U \setminus V,H,c\text{-}us}\big]\cdot \eta^{| U |-|V|},\\
\end{aligned}
\end{equation}
where the last inequality follows from Property (b).

Now we upper bound $|\overline{\mathbf{Y}}_{H-S_V}|$ as follows:
\begin{align*}
|\overline{\mathbf{Y}}_{H-S_V}|&=|\mathbf{Y}_{H-S_V}|\cdot \mathbbm{1}_{\mathcal{E}_{V(H-S_V)}^c}\\
&=|\mathbf{Y}_{H-S_V}|\cdot \prod_{uv\in E(H)\setminus S_V} \left(\mathbbm{1}_{\{d_{\mathbf{G}}(u)\leq \Delta\}}\cdot\mathbbm{1}_{\{d_{\mathbf{G}}(v)\leq \Delta\}}\right)\\
&\leq |\mathbf{Y}_{H-S_V}|\cdot \prod_{uv\in E_{\geq 2}(H)} \left(\mathbbm{1}_{\{d_{\mathbf{G}}(u)\leq \Delta\}}\cdot\mathbbm{1}_{\{d_{\mathbf{G}}(v)\leq \Delta\}}\right)\\
&\leq |\mathbf{Y}_{H-S_V}|\cdot \prod_{uv\in E_{\geq 2}(H)} \left(\mathbbm{1}_{\{d_{\mathbf{G}\cap E_{\geq 2}(H)}(u)\leq \Delta\}}\cdot\mathbbm{1}_{\{d_{\mathbf{G}\cap E_{\geq 2}(H)}(v)\leq \Delta\}}\right)\\
&= |\mathbf{Y}_{(H-S_V)\cap E_1(H)}|\cdot \prod_{uv\in E_{\geq 2}(H)} \left(\mathbf{Y}_{uv}^{m_H(uv)}\mathbbm{1}_{\{d_{\mathbf{G}\cap E_{\geq 2}(H)}(u)\leq \Delta\}}\cdot\mathbbm{1}_{\{d_{\mathbf{G}\cap E_{\geq 2}(H)}(v)\leq \Delta\}}\right)\\
&=  |\mathbf{Y}_{E_1(H)\setminus S_V}|\cdot  |\tilde{\mathbf{Y}}_{\geq 2}^H|.
\end{align*}

Therefore,
\begin{align*}
\mathbb{E}\big[|\overline{\mathbf{Y}}_{H-S_V}|\big|\mathbf{x},\mathcal{E}\cap\mathcal{E}_{V,H,c\text{-}s}&\cap \mathcal{E}_{ U \setminus V,H,c\text{-}us}\big]\\
&\leq\mathbb{E}\Big[|\mathbf{Y}_{E_1(H)\setminus S_V}|\cdot  |\tilde{\mathbf{Y}}_{\geq 2}^H|\Big|\mathbf{x},\mathcal{E}\cap\mathcal{E}_{V,H,c\text{-}s}\cap \mathcal{E}_{ U \setminus V,H,c\text{-}us}\Big]\\
&\stackrel{(\ast)}{=}\mathbb{E}\Big[|\mathbf{Y}_{E_1(H)\setminus S_V}|\cdot  |\tilde{\mathbf{Y}}_{\geq 2}^H|\Big|\mathbf{x},\mathcal{E}\Big]\\
&\stackrel{(\dagger)}{=}\mathbb{E}\big[|\mathbf{Y}_{E_1(H)\setminus S_V}|\big|\mathbf{x},\mathcal{E}\big]\cdot \mathbb{E}\big[|\tilde{\mathbf{Y}}_{\geq 2}^H|\big|\mathbf{x},\mathcal{E}\big]\\
&\stackrel{(\ddagger)}{=}\mathbb{E}\big[|\mathbf{Y}_{E_1(H)\setminus S_V}|\big|\mathbf{x}\big]\cdot \mathbb{E}\big[|\tilde{\mathbf{Y}}_{\geq 2}^H|\big|\mathbf{x},\mathcal{E}\big],
\end{align*}
where $(\ast)$ follows from Property (c), $(\dagger)$ and $(\ddagger)$ follow from the fact that $\mathcal{E}$ is $\sigma(\mathbf{x},\mathbf{G}-E_1(H))$-measurable. Hence,
\begin{equation}
\label{eq:eq_lem_UH_deg1_common_eq2}
\begin{aligned}
\mathbb{E}\big[|\overline{\mathbf{Y}}_{H-S_V}|\big|\mathbf{x},\mathcal{E}\cap\mathcal{E}_{V,H,c\text{-}s}\cap \mathcal{E}_{ U \setminus V,H,c\text{-}us}\big]&\leq\left(\prod_{uv\in E_1(H)\setminus S_V}\mathbb{E}\big[|\mathbf{Y}_{uv}|\big|\mathbf{x}\big]\right)\cdot \mathbb{E}\big[|\tilde{\mathbf{Y}}_{\geq 2}^H|\big|\mathbf{x},\mathcal{E}\big]\\
&\leq \left[\prod_{uv\in E_1(H)\setminus S_V}\left(2+\frac{\epsilon}{2}\right)\frac{d}{n}\right]\cdot \mathbb{E}\big[|\tilde{\mathbf{Y}}_{\geq 2}^H|\big|\mathbf{x},\mathcal{E}\big]\\
&\leq \left(\frac{3d}{n}\right)^{|E_1(H)\setminus S_V|}\cdot \mathbb{E}\big[|\tilde{\mathbf{Y}}_{\geq 2}^H|\big|\mathbf{x},\mathcal{E}\big],
\end{aligned}
\end{equation}

Combining \cref{eq:eq_lem_UH_deg1_common_eq1} and \cref{eq:eq_lem_UH_deg1_common_eq2}, we get:
\begin{align*}
\big|\mathbb{E}\big[\overline{\mathbf{Y}}_H\big|\mathbf{x},\mathcal{E}\big]\big|&\leq \sum_{V\subseteq  U }\left(\frac{\epsilon d}{2n}\right)^{|S_V|}\cdot \eta^{| U |-|V|} \left(\frac{3d}{n}\right)^{|E_1(H)\setminus S_V|}\cdot \mathbb{E}\big[|\tilde{\mathbf{Y}}_{\geq 2}^H|\big|\mathbf{x},\mathcal{E}\big]\\
&\stackrel{(\wr)}{=}\left(\frac{3d}{n}\right)^{|E_1^a(H)|+|E_1^d(H)|}\cdot\mathbb{E}\big[|\tilde{\mathbf{Y}}_{\geq 2}^H|\big|\mathbf{x},\mathcal{E}\big]\cdot \sum_{V\subseteq  U }\left(\frac{\epsilon d}{2n}\right)^{|S_V|}\cdot \eta^{| U |-|V|} \left(\frac{3d}{n}\right)^{|E_1^b(H)\setminus S_V|}\\
&=\left(\frac{3d}{n}\right)^{|E_1^a(H)|+|E_1^d(H)|}\cdot\mathbb{E}\big[|\tilde{\mathbf{Y}}_{\geq 2}^H|\big|\mathbf{x},\mathcal{E}\big]\cdot \left(\frac{\epsilon d}{2n}\right)^{|E_1^b(H)|}\cdot \sum_{V\subseteq  U } \eta^{| U |-|V|} \left(\frac{6}{\epsilon}\right)^{|E_1^b(H)\setminus S_V|},
\end{align*}
where $(\wr)$ follows from the fact that $\big\{E_1^a(H),E_1^b(H)\setminus S_V, E_1^d(H)\big\}$ is a partition of $E_1(H)\setminus S_V$.

Now since $|S_V|\geq |E_1^b(H)|-\tau\cdot|\mathcal{S}(H)\setminus V|$, we have $$|E_1^b(H)\setminus S_V|\leq \tau\cdot|\mathcal{S}(H)\setminus V|.$$

Therefore,
\begin{align*}
\sum_{V\subseteq  U } \eta^{| U |-|V|} &\left(\frac{6}{\epsilon}\right)^{|E_1^b(H)\setminus S_V|}\\
&\leq \sum_{V\subseteq  U } \eta^{|U|-|V|} \left(\frac{6}{\epsilon}\right)^{\tau\cdot(| \mathcal{S}(H) |-|V|)}= \left(\frac{6}{\epsilon}\right)^{\tau(|\mathcal{S}(H)|-|U|)}\sum_{V\subseteq  U } \left[\eta\left(\frac{6}{\epsilon}\right)^{\tau}\right]^{| U |-|V|}\\
&=\left(\frac{6}{\epsilon}\right)^{\tau(|\mathcal{S}(H)|-|U|)}\left[1+\eta\left(\frac{6}{\epsilon}\right)^{\tau}\right]^{| U |}\leq\left(\frac{6}{\epsilon}\right)^{\tau(|\mathcal{S}(H)|-|U|)}\left[1+\frac{1}{As}\left(\frac{\epsilon}{6}\right)^{\tau}\left(\frac{6}{\epsilon}\right)^{\tau}\right]^{st}\\
&=\left(\frac{6}{\epsilon}\right)^{\tau(|\mathcal{S}(H)|-|U|)}\left(1+\frac{1}{As}\right)^{st}\leq e^{\frac{1}{As}st}\left(\frac{6}{\epsilon}\right)^{\tau(|\mathcal{S}(H)|-|U|)}\\
&=e^{\frac{1}{A}\cdot K\cdot \log n}\left(\frac{6}{\epsilon}\right)^{\tau(|\mathcal{S}(H)|-|U|)}=n^{\frac{K}{A}}\left(\frac{6}{\epsilon}\right)^{\tau(|\mathcal{S}(H)|-|U|)}.
\end{align*}

We conclude that

\begin{align*}
&\big|\mathbb{E}\big[\overline{\mathbf{Y}}_H\big|\mathbf{x},\mathcal{E}\big]\big|\\
&\quad\quad\leq n^{\frac{K}{A}}\left(\frac{6}{\epsilon}\right)^{\tau(|\mathcal{S}(H)|-|U|)}\cdot\left(\frac{3d}{n}\right)^{|E_1^a(H)|+|E_1^d(H)|}\cdot\mathbb{E}\big[|\tilde{\mathbf{Y}}_{\geq 2}^H|\big|\mathbf{x},\mathcal{E}\big]\cdot \left(\frac{\epsilon d}{2n}\right)^{|E_1^b(H)|}\\
&\quad\quad=n^{\frac{K}{A}}\left(\frac{6}{\epsilon}\right)^{\tau(|\mathcal{S}(H)|-|U|)}\cdot\left(\frac{\epsilon d}{2n}\right)^{|E_1^a(H)|+|E_1^d(H)|+|E_1^b(H)|}\cdot \left(\frac{6}{\epsilon}\right)^{|E_1^a(H)|+|E_1^d(H)|}\cdot\mathbb{E}\big[|\tilde{\mathbf{Y}}_{\geq 2}^H|\big|\mathbf{x},\mathcal{E}\big]\\
&\quad\quad=n^{\frac{K}{A}}\left(\frac{6}{\epsilon}\right)^{|E_1^a(H)|+|E_1^d(H)|+\tau(|\mathcal{S}(H)|-|U|)}\cdot\left(\frac{\epsilon d}{2n}\right)^{|E_1(H)|}\cdot\mathbb{E}\big[|\tilde{\mathbf{Y}}_{\geq 2}^H|\big|\mathbf{x},\mathcal{E}\big].
\end{align*}
\end{proof}

\begin{lemma}
\label{lem:lem_UH_deg1_common_E_1d}
Let $H$ be a multigraph with at most $st=sK\log n$ vertices and at most $st$ multi-edges, we have
$$\left(\frac{6}{\epsilon}\right)^{|E_1^d(H)|}\leq n^{\frac{K}{A}}.$$
\end{lemma}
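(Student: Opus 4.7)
The plan is to show $|E_1^d(H)|$ is small enough that raising $6/\epsilon$ to that power is absorbed by $n^{K/A}$, exploiting the fact that $\Delta$ was chosen polynomial in $s, \log(6/\epsilon), A$.

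First I would unpack the definitions. Every $uv \in E_1^d(H)$ is a multiplicity-$1$ edge that is \emph{not} annoying, so both endpoints lie in $\mathcal{S}_1(H)$ (i.e.\ $d_1^H(u), d_1^H(v) \le \tau$); but by the defining property of $E_1^d(H)$, at least one endpoint lies in $V(H)\setminus \mathcal{S}_{\geq 2}(H)$. Let
\[
T := \mathcal{S}_1(H)\cap \bigl(V(H)\setminus \mathcal{S}_{\geq 2}(H)\bigr) = \{v\in V(H)\,:\, d_1^H(v)\le \tau \text{ and } d_{\geq 2}^H(v)>\Delta/4\}.
\]
Each edge in $E_1^d(H)$ is incident to some vertex of $T$, and every $v\in T$ is incident to at most $d_1^H(v)\le \tau$ multiplicity-$1$ edges, so
\[
|E_1^d(H)| \le \sum_{v\in T} d_1^H(v) \le \tau\cdot |T|.
\]

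Next I would bound $|T|$. Since $H$ has at most $st$ multi-edges, $\sum_{e\in E_{\geq 2}(H)} m_H(e)\le st$, and each such edge has multiplicity $\ge 2$, so $|E_{\geq 2}(H)|\le st/2$. Hence $\sum_{v\in V(H)} d_{\geq 2}^H(v) = 2|E_{\geq 2}(H)| \le st$. Because each $v\in T$ contributes more than $\Delta/4$ to this sum, $|T|\le \frac{4st}{\Delta}$. Combining with the previous bound,
\[
|E_1^d(H)| \le \tau\cdot \frac{4st}{\Delta} = \frac{4As^2 t\,\log(6/\epsilon)}{\Delta}.
\]

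Finally, I would take logarithms. Using $t = K\log n$,
\[
\log\!\left(\tfrac{6}{\epsilon}\right)^{|E_1^d(H)|} \le \frac{4As^2 t\,(\log(6/\epsilon))^2}{\Delta},
\]
so it suffices to check $\frac{4As^2(\log(6/\epsilon))^2}{\Delta} \le \frac{1}{A}$, i.e.\ $\Delta \ge 4A^2 s^2(\log(6/\epsilon))^2$. This is implied by the definition \cref{eq:eq_Delta_form}, which contains the summand $8A^2\tau^2 s^2(\log(6/\epsilon))^2 \ge 4A^2 s^2(\log(6/\epsilon))^2$ (since $\tau\ge 1$). There is no real obstacle here; the only thing to be careful about is to track the constants so as to land strictly inside the budget provided by the chosen $\Delta$, and to verify once that $\tau\ge 1$ in the regime of interest (which follows from $A, s\ge 1$ and $\epsilon \le 2$).
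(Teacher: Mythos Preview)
Your proposal is correct and follows essentially the same approach as the paper: bound $|E_1^d(H)|$ by $\tau$ times the number of vertices in $\mathcal{S}_1(H)\setminus\mathcal{S}_{\geq 2}(H)$, bound that vertex count via a degree-sum argument using $d_{\geq 2}^H(v)>\Delta/4$, and then invoke the lower bound on $\Delta$ from \cref{eq:eq_Delta_form}. Your intermediate constant $|T|\le 4st/\Delta$ is even slightly sharper than the paper's $8st/\Delta$ (since you sum $d_{\geq 2}^H$ rather than $d_{G(H)}$), but the argument and the final conclusion are the same.
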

\begin{proof}
Recall that
$$E_1^d(H)=\big\{uv\in E_1(H)\setminus E_1^a(H):\; u\notin \mathcal{S}_{\geq 2}(H)\text{ or }v\notin\mathcal{S}_{\geq 2}(H)\big\}.$$

Since every edge $uv\in E_1^d(H)$ satisfies $uv\in E_1(H)\setminus E_1^a(H)$, we must have $u\in\mathcal{S}_1(H)$ and $v\in\mathcal{S}_1(H)$. On the other hand, every edge in $E_1^d(H)$ is incident to at least one vertex in $V(H)\setminus\mathcal{S}_{\geq 2}(H)=\mathcal{I}_{\geq2}(H)\cup\mathcal{L}_{\geq2}(H)$. Therefore, every edge in $E_1^d(H)$ is a multiplicity-1 edge that is incident to at least one vertex in $\mathcal{S}_1(H)\cap (\mathcal{I}_{\geq2}(H)\cup\mathcal{L}_{\geq2}(H))$.

Since we have at most $st$ edges in $G(H)$, we have 
\begin{equation}
\label{eq:eq_Bound_on_I2_and_L2}
\frac{\Delta}{4}\cdot|\mathcal{I}_{\geq2}(H)\cup\mathcal{L}_{\geq2}(H)|\leq \sum_{v\in \mathcal{I}_{\geq2}(H)\cup\mathcal{L}_{\geq2}(H)}d_{G(H)}(v)\leq \sum_{v\in V(H)}d_{G(H)}(v) \leq 2st.
\end{equation}
Therefore,
\begin{align*}
\big|\mathcal{S}_1(H)\cap \big(\mathcal{I}_{\geq2}(H)\cup\mathcal{L}_{\geq2}(H)\big)\big|&\leq |\mathcal{I}_{\geq2}(H)\cup\mathcal{L}_{\geq2}(H)|\\
&\leq \frac{8st}{\Delta}\stackrel{(\ast)}{\leq} \frac{8st}{8\tau^2 A^2s^2 \left(\log\frac{6}{\epsilon}\right)^2}\leq \frac{t}{A\tau \left(\log\frac{6}{\epsilon}\right)},
\end{align*}
where $(\ast)$ follows from \cref{eq:eq_Delta_form}. Now since every edge in $E_1^d(H)$ is is incident to at least one vertex in $\mathcal{S}_1(H)\cap (\mathcal{I}_{\geq2}(H)\cup\mathcal{L}_{\geq2}(H))$, and since every vertex in $\mathcal{S}_1(H)$ is incident to at most $\tau$ multiplicity-1 edges, we conclude that
\begin{align*}
|E_1^d(H)|\leq \frac{t}{A\tau \left(\log\frac{6}{\epsilon}\right)}\cdot \tau\leq \frac{t}{A\log\frac{6}{\epsilon}},
\end{align*}
and
$$\left(\frac{6}{\epsilon}\right)^{|E_1^d(H)|}\leq  e^{\frac{t}{A\log\frac{6}{\epsilon}}\log\frac{6}{\epsilon}}=e^{\frac{1}{A}\cdot K\log n}=n^{\frac{K}{A}}.$$
\end{proof}

\begin{lemma}
\label{lem:lem_UH_deg1_common_2}
Let $H$ be a multigraph with at most $st=sK\log n$ vertices and at most $st$ multi-edges, and let $\mathcal{S}(H)=\mathcal{S}_1(H)\cap\mathcal{S}_{\geq 2}(H)$. Assume that $U\subseteq \mathcal{S}(H)$ and $\mathcal{E}$ satisfy the conditions of \Cref{lem:lem_UH_deg1_common}. Furthermore, assume that $\mathcal{E}$ satisfies the following additional condition:
\begin{itemize}
\item[(d)] Given $\mathbf{x}$, the event $\mathcal{E}$ is conditionally independent from $(\mathbbm{1}_{\{uv\in \mathbf{G}\}})_{uv\in E_{\geq 2}(H)}$.
\end{itemize}
Then,
$$\big|\mathbb{E}\big[\overline{\mathbf{Y}}_H\big|\mathbf{x},\mathcal{E}\big]\big|\leq n^{\frac{2K}{A}}\left(\frac{6}{\epsilon}\right)^{|E_1^a(H)|+\tau(|\mathcal{S}(H)|-|U|)}\left(\frac{\epsilon d}{2n}\right)^{|E_1(H)|} \cdot\mathbb{E}\big[|\tilde{\mathbf{Y}}_{\geq 2}^H|\big|\mathbf{x}\big].$$
\end{lemma}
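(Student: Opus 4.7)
The proof is essentially a direct combination of the two preceding lemmas together with the extra independence hypothesis (d), so the plan is short and mechanical rather than involving any new probabilistic input.

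First, since $U \subseteq \mathcal{S}(H)$ and $\mathcal{E}$ satisfy the hypotheses of \Cref{lem:lem_UH_deg1_common}, I would invoke that lemma directly to obtain
\[
\big|\mathbb{E}\big[\overline{\mathbf{Y}}_H\big|\mathbf{x},\mathcal{E}\big]\big| \le n^{\frac{K}{A}}\Paren{\frac{6}{\epsilon}}^{|E_1^a(H)|+|E_1^d(H)|+\tau(|\mathcal{S}(H)|-|U|)} \Paren{\frac{\epsilon d}{2n}}^{|E_1(H)|}\cdot \mathbb{E}\big[|\tilde{\mathbf{Y}}_{\geq 2}^H|\,\big|\,\mathbf{x},\mathcal{E}\big].
\]

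Next, I would eliminate the $|E_1^d(H)|$ exponent by applying \Cref{lem:lem_UH_deg1_common_E_1d}, which gives $(6/\epsilon)^{|E_1^d(H)|}\le n^{K/A}$. Multiplying this into the existing $n^{K/A}$ prefactor produces the desired $n^{2K/A}$ and removes $|E_1^d(H)|$ from the exponent on $6/\epsilon$.

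Finally, to replace the conditional expectation $\mathbb{E}\big[|\tilde{\mathbf{Y}}_{\geq 2}^H|\,\big|\,\mathbf{x},\mathcal{E}\big]$ by the unconditional one $\mathbb{E}\big[|\tilde{\mathbf{Y}}_{\geq 2}^H|\,\big|\,\mathbf{x}\big]$, I would observe that $|\tilde{\mathbf{Y}}_{\geq 2}^H|$ is a deterministic function of $\mathbf{x}$ and of the edge indicators $(\mathbbm{1}_{\{uv\in\mathbf{G}\}})_{uv\in E_{\ge 2}(H)}$ (by \Cref{def:def_tilde_X_E_2}). Hypothesis (d) asserts exactly that, conditioned on $\mathbf{x}$, the event $\mathcal{E}$ is independent of these indicators, hence conditioning further on $\mathcal{E}$ does not change the conditional expectation of $|\tilde{\mathbf{Y}}_{\geq 2}^H|$ given $\mathbf{x}$. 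Combining the three steps yields the claimed bound.

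There is no genuine obstacle here; the statement is engineered so that conditions (a)--(c) from \Cref{lem:lem_UH_deg1_common} handle the edges of multiplicity $1$ and the safeness arguments, \Cref{lem:lem_UH_deg1_common_E_1d} handles the loss from $E_1^d(H)$, and the new condition (d) decouples the edges of multiplicity at least $2$ from the conditioning event. The only care needed is to verify the measurability remark --- that $|\tilde{\mathbf{Y}}_{\geq 2}^H|$ depends only on $\mathbf{x}$ and on $(\mathbbm{1}_{\{uv\in \mathbf{G}\}})_{uv\in E_{\geq 2}(H)}$ --- which is immediate from the definition of $\tilde{\mathbf{Y}}_{uv,E_{\geq 2}(H)}$.
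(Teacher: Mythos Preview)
Your proposal is correct and follows essentially the same approach as the paper: apply \Cref{lem:lem_UH_deg1_common}, use condition (d) to drop the conditioning on $\mathcal{E}$ in the expectation of $|\tilde{\mathbf{Y}}_{\geq 2}^H|$, and use \Cref{lem:lem_UH_deg1_common_E_1d} to absorb the $(6/\epsilon)^{|E_1^d(H)|}$ factor into an extra $n^{K/A}$. The only difference is that the paper applies condition (d) before invoking \Cref{lem:lem_UH_deg1_common_E_1d}, whereas you swap the order, which is immaterial.
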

\begin{proof}
Let $E_1^b(H)$ and $E_1^d(H)$ be as in \cref{eq:eq_E_1b_H} and \cref{eq:eq_E_1c_H}, respectively. From \Cref{lem:lem_UH_deg1_common}, we know that
$$\big|\mathbb{E}\big[\overline{\mathbf{Y}}_H\big|\mathbf{x},\mathcal{E}\big]\big|\leq n^{\frac{K}{A}}\left(\frac{6}{\epsilon}\right)^{|E_1^a(H)|+|E_1^d(H)|+\tau(|\mathcal{S}(H)|-|U|)}\cdot\left(\frac{\epsilon d}{2n}\right)^{|E_1(H)|} \cdot\mathbb{E}\big[|\tilde{\mathbf{Y}}_{\geq 2}^H|\big|\mathbf{x},\mathcal{E}\big].$$
On the other hand, Property (d) implies that $\mathbb{E}\big[|\tilde{\mathbf{Y}}_{\geq 2}^H|\big|\mathbf{x},\mathcal{E}\big]=\mathbb{E}\big[|\tilde{\mathbf{Y}}_{\geq 2}^H||\mathbf{x}\big]$. Therefore,
\begin{equation}
\label{eq:eq_lem_UH_deg1_common_2_eq1}
\big|\mathbb{E}\big[\overline{\mathbf{Y}}_H\big|\mathbf{x},\mathcal{E}\big]\big|\leq n^{\frac{K}{A}}\left(\frac{6}{\epsilon}\right)^{|E_1^a(H)|+|E_1^d(H)|+\tau(|\mathcal{S}(H)|-|U|)}\cdot\left(\frac{\epsilon d}{2n}\right)^{|E_1(H)|} \cdot\mathbb{E}\big[|\tilde{\mathbf{Y}}_{\geq 2}^H|\big|\mathbf{x}\big].
\end{equation}

Combining this with \Cref{lem:lem_UH_deg1_common_E_1d}, we get
\begin{align*}
\left|\mathbb{E}\big[\overline{\mathbf{Y}}_H\big|\mathbf{x},\mathcal{E}\big]\right|&\leq n^{\frac{2K}{A}}\left(\frac{6}{\epsilon}\right)^{|E_1^a(H)|+\tau(|\mathcal{S}(H)|-|U|)}\left(\frac{\epsilon d}{2n}\right)^{|E_1(H)|} \cdot\big|\mathbb{E}\big[|\tilde{\mathbf{Y}}_{\geq 2}^H|\big|\mathbf{x}\big]\big|.
\end{align*}
\end{proof}

Now we are ready to prove \Cref{lem:lem_UH_deg1}.

\begin{proof}[Proof of \Cref{lem:lem_UH_deg1}]
Let $\mathcal{S}(H)=\mathcal{S}_1(H)\cap\mathcal{S}_{\geq 2}(H)$. By using \Cref{lem:lem_bound_prob_completely_unsafe}, it can be easily seen that if we take $U=\mathcal{S}(H)$ and $\mathcal{E}$ to be an "almost sure event", i.e., $\mathbb{P}[\mathcal{E}]=1$, then the conditions of \Cref{lem:lem_UH_deg1_common} and \Cref{lem:lem_UH_deg1_common_2} are satisfied. Therefore,
\begin{align*}
\big|\mathbb{E}\big[\overline{\mathbf{Y}}_H\big|\mathbf{x}\big]\big|&=\big|\mathbb{E}\big[\overline{\mathbf{Y}}_H\big|\mathbf{x},\mathcal{E}\big]\big|\\
&\leq n^{\frac{2K}{A}}\left(\frac{6}{\epsilon}\right)^{|E_1^a(H)|}\left(\frac{\epsilon d}{2n}\right)^{|E_1(H)|} \cdot\big|\mathbb{E}\big[|\tilde{\mathbf{Y}}_{\geq 2}^H|\big|\mathbf{x}\big]\big|.
\end{align*}
\end{proof}

\subsubsection{Upper bound on the contribution of edges of multiplicity at least 2}
\label{subsubsec:subsubsec_edges_mult_2_upper}

\begin{proof}[Proof of \Cref{lem:lem_UH_deg2}]
Recall the definitions of $E_{\geq2}^a(H)$ and $E_{\geq2}^b(H)$ from \Cref{def:def_E1_E2_annoying}.

We have
\begin{align*}
|\tilde{\mathbf{Y}}_{\geq 2}^H|&=\prod_{uv\in E_{\geq 2}(H)}\big|\tilde{\mathbf{Y}}_{uv,E_{\geq 2}(H)}^{m_H(uv)}\big|\\
&=\prod_{uv\in E_{\geq 2}(H)}\big|\mathbf{Y}_{uv}^{m_H(uv)}\big|\cdot\mathbbm{1}_{\{d_{\mathbf{G}\cap E_{\geq 2}(H)}(u)\leq \Delta\}}\cdot \mathbbm{1}_{\{d_{\mathbf{G}\cap E_{\geq 2}(H)}(v)\leq \Delta\}}\\
&\leq \left(\prod_{uv\in E_{\geq2}^a(H)}\big|\mathbf{Y}_{uv}^{m_H(uv)}\big|\right)\cdot\left(\prod_{uv\in E_{\geq2}^b(H)}\big|\mathbf{Y}_{uv}^{m_H(uv)}\big|\right)\cdot\left(\prod_{v\in\mathcal{L}_{\geq2}(H)}\mathbbm{1}_{\{d_{\mathbf{G}\cap E_{\geq2}^b(H)}(v)\leq \Delta\}}\right)\\
&=\big|\mathbf{Y}_{H_{\geq2}^a}\big|\cdot\big|\mathbf{Y}_{H_{\geq2}^b}\big|\cdot \mathbbm{1}_{\mathcal{E}_{\geq 2}^{b,H}},
\end{align*}
where
$$\mathbf{Y}_{H_{\geq2}^a}=\prod_{uv\in E_{\geq2}^a(H)}\mathbf{Y}_{uv}^{m_H(uv)},$$
$$\mathbf{Y}_{H_{\geq2}^b}=\prod_{uv\in E_{\geq2}^b(H)}\mathbf{Y}_{uv}^{m_H(uv)},$$
and
\begin{align*}
\mathcal{E}_{\geq 2}^{b,H}&=\bigcap_{v\in\mathcal{L}_{\geq2}(H)}\big\{d_{\mathbf{G}\cap E_{\geq2}^b(H)}(v)\leq \Delta\big\}.
\end{align*}

Therefore,

\begin{equation}
\label{eq:eq_lem_UH_deg2_eq1}
\begin{aligned}
\mathbb{E}\big[|\tilde{\mathbf{Y}}_{\geq 2}^H|\big|\mathbf{x}\big]&= \mathbb{E}\big[|\mathbf{Y}_{H_{\geq2}^a}|\cdot|\mathbf{Y}_{H_{\geq2}^b}|\cdot \mathbbm{1}_{\mathcal{E}_{\geq 2}^{b,H}}\big|\mathbf{x}\big]\\
&\stackrel{(\ast)}{=}\mathbb{E}\big[|\mathbf{Y}_{H_{\geq2}^a}|\big|\mathbf{x}\big]\cdot \mathbb{E}\big[|\mathbf{Y}_{H_{\geq2}^b}|\cdot \mathbbm{1}_{\mathcal{E}_{\geq 2}^{b,H}}\big|\mathbf{x}\big],
\end{aligned}
\end{equation}
where $(\ast)$ follows from the fact that given $\mathbf{x}$, the random variable $\mathbf{Y}_{H_{\geq2}^a}$ is conditionally independent from $(\mathbf{Y}_{H_{\geq2}^b},\mathbbm{1}_{\mathcal{E}_{\geq 2}^{b,H}})$. This is because the presence or absence of edges in $E_{\geq2}^a(H)$ do not affect the degree $d_{\mathbf{G}\cap E_{\geq2}^b(H)}(v)$ of any vertex $v\in\mathcal{L}_{\geq2}(H)$.

Now let us evaluate $\mathbb{E}\big[|\mathbf{Y}_{H_{\geq2}^a}|\big|\mathbf{x}\big]$ and $\mathbb{E}\big[|\mathbf{Y}_{H_{\geq2}^b}|\cdot \mathbbm{1}_{\mathcal{E}_{\geq 2}^{b,H}}\big|\mathbf{x}\big]$. We have

\begin{equation}
\label{eq:eq_lem_UH_deg2_eq2}
\begin{aligned}
\mathbb{E}\big[|\mathbf{Y}_{H_{\geq2}^a}|\big|\mathbf{x}\big]
&=\prod_{uv\in E_{\geq2}^a(H)}\mathbb{E}\left[|\mathbf{Y}_{uv}|^{m_H(uv)}\middle|\mathbf{x}\right]\\
&=\prod_{uv\in E_{\geq2}^a(H)}\left[\left(1+\frac{\epsilon \mathbf{x}_u\mathbf{x}_v}{2}\right)\frac{d}{n}\left(1-\frac{d}{n}\right)^{m_H(uv)}+\left(1-\left(1+\frac{\epsilon \mathbf{x}_u\mathbf{x}_v}{2}\right)\frac{d}{n}\right)\left(\frac{d}{n}\right)^{m_H(uv)}\right]\\
&\leq\prod_{uv\in E_{\geq2}^a(H)}\left[\left(1+\frac{\epsilon \mathbf{x}_u\mathbf{x}_v}{2}\right)\frac{d}{n}+\frac{d^2}{n^2}\right].
\end{aligned}
\end{equation}

On the other hand,

\begin{equation}
\label{eq:eq_lem_UH_deg2_eq3}
\begin{aligned}
&\mathbb{E}\big[|\mathbf{Y}_{H_{\geq2}^b}|\cdot \mathbbm{1}_{\mathcal{E}_{\geq 2}^{b,H}}\big|\mathbf{x}\big]\\
&=\sum_{\substack{S\subseteq E_{\geq2}^b(H):\\\forall v\in\mathcal{L}_{\geq2}(H), d_{S}(v)\leq \Delta}}\left[\prod_{uv\in S}\left(1+\frac{\epsilon \mathbf{x}_u\mathbf{x}_v}{2}\right)\frac{d}{n}\left(1-\frac{d}{n}\right)^{m_H(uv)}\right]\\
&\quad\quad\quad\quad\quad\quad\quad\quad\quad\quad\quad\quad\quad\quad\quad\quad\quad\quad\times\left[\prod_{uv\in E_{\geq2}^b(H)\setminus S} \left(1-\left(1+\frac{\epsilon \mathbf{x}_u\mathbf{x}_v}{2}\right)\frac{d}{n}\right)\left(\frac{d}{n}\right)^{m_H(uv)}\right]\\
&\leq\sum_{\substack{S\subseteq E_{\geq2}^b(H):\\\forall v\in\mathcal{L}_{\geq2}(H),\; d_{S}(v)\leq \Delta}}\left[\prod_{uv\in S}\frac{2d}{n}\right]\cdot \left[\prod_{uv\in E_{\geq2}^b(H)\setminus S} \left(\frac{2d}{n}\right)^{2}\right]=\sum_{\substack{S\subseteq E_{\geq2}^b(H):\\\forall v\in\mathcal{L}_{\geq2}(H),\; d_{S}(v)\leq \Delta}}\left(\frac{2d}{n}\right)^{|S|+2|E_{\geq2}^b(H)|-2|S|}\\
&=\left(\frac{2d}{n}\right)^{|E_{\geq2}^b(H)|} \sum_{\substack{S\subseteq E_{\geq2}^b(H):\\\forall v\in\mathcal{L}_{\geq2}(H),\; d_{S}(v)\leq \Delta}}\left(\frac{2d}{n}\right)^{|E_{\geq2}^b(H)|-|S|}.
\end{aligned}
\end{equation}

Now define
$$E_{\geq 2}^{b,i}(H)=\big\{uv\in E_{\geq2}^b(H):\; u\in\mathcal{L}_{\geq2}(H)\text{ and }v\in\mathcal{L}_{\geq2}(H)\big\}.$$
The superscript $i$ indicates that the edges in $E_{\geq 2}^{b,i}(H)$ are internal to $\mathcal{L}_{\geq2}(H)$, i.e., both end-vertices are in $\mathcal{L}_{\geq2}(H)$. Furthermore, for every $v\in\mathcal{L}_{\geq2}(H)$, define:
$$E_{\geq2}^b(v,H)=\big\{uv:\;uv\in E_{\geq2}^b(H)\big\},$$
$$E_{\geq2}^{b,i}(v,H)=\big\{uv:\; uv\in E_{\geq2}^b(H)\text{ and }u\in\mathcal{L}_{\geq2}(H)\big\},$$
and
$$E_{\geq2}^{b,o}(v,H)=\big\{uv:\; uv\in E_{\geq2}^b(H)\text{ and }u\notin\mathcal{L}_{\geq2}(H)\big\}.$$

The superscript $o$ in $E_{\geq 2}^{b,o}(v,H)$ indicates that the edges in $E_{\geq 2}^{b,o}(v,H)$ go from $v$ to $V(H)\setminus \mathcal{L}_{\geq2}(H)$, i.e., they go to the "outside" of $\mathcal{L}_{\geq2}(H)$. We have the following:
\begin{itemize}
\item For every $v\in\mathcal{L}_{\geq2}(H)$, $\big\{E_{\geq2}^{b,i}(v,H),E_{\geq2}^{b,o}(v,H)\big\}$ is a partition of $E_{\geq2}^b(v,H)$.
\item For every $v\in\mathcal{L}_{\geq2}(H)$, we have $|E_{\geq2}^b(v,H)|=d^H_{\geq 2}(v)>\Delta$.
\item $\big\{E_{\geq 2}^{b,i}(H)\big\}\cup\big\{E_{\geq2}^{b,o}(v,H):v\in\mathcal{L}_{\geq2}(H)\big\}$ is a partition of $E_{\geq2}^b(H)$.
\end{itemize}

Now for every $S\subseteq E_{\geq2}^b(H)$, define
$$S^i=S\cap E_{\geq 2}^{b,i}(H),$$
and for every $v\in\mathcal{L}_{\geq2}(H)$, define
$$S_v=S\cap E_{\geq2}^b(v,H),\quad S^i_v=S\cap E_{\geq 2}^{b,i}(v,H)\quad\text{and}\quad S^o_v=S\cap E_{\geq 2}^{b,o}(v,H).$$
It is easy to see that $S^i_v=S_v\cap E_{\geq 2}^{b,i}(v,H)=S^i\cap E_{\geq 2}^{b,i}(v,H)$ and $S^o_v=S_v\cap E_{\geq 2}^{b,o}(v,H)$.

Now for two sequences of sets $(A_i)_{i\in I}$ and $(B_i)_{i\in I}$ that are indexed by the same index set $I$, we write $(A_i)_{i\in I}\subseteq(B_i)_{i\in I}$ to indicate that $A_i\subseteq B_i$ for all $i\in I$. We have
\begin{align*}
&\sum_{\substack{S\subseteq E_{\geq2}^b(H):\\\forall v\in\mathcal{L}_{\geq2}(H),\; d_{S}(v)\leq \Delta}}\left(\frac{2d}{n}\right)^{|E_{\geq2}^b(H)|-|S|}\\
&\quad\quad\quad\quad\quad\quad=\sum_{\substack{S^i\subseteq E_{\geq2}^{b,i}(H),\\(S^o_v)_{v\in\mathcal{L}_2(H)}\subseteq (E_{\geq 2}^{b,o}(v,H))_{v\in\mathcal{L}_2(H)}:\\\forall v\in\mathcal{L}_{\geq2}(H),\; |S^i_v|+|S^o_v|\leq \Delta}}\left(\frac{2d}{n}\right)^{|E_{\geq2}^b(H)|-|S^i|-\sum_{v\in\mathcal{L}_{\geq2}(H)}|S^o_v|}\\
&\quad\quad\quad\quad\quad\quad=\sum_{\substack{S^i\subseteq E_{\geq2}^{b,i}(H),\\(S^o_v)_{v\in\mathcal{L}_2(H)}\subseteq (E_{\geq 2}^{b,o}(v,H))_{v\in\mathcal{L}_2(H)}:\\\forall v\in\mathcal{L}_{\geq2}(H),\; |S^i_v|+|S^o_v|\leq \Delta}}\left(\frac{2d}{n}\right)^{|E_{\geq2}^{b,i}(H)|-|S^i|}\prod_{v\in\mathcal{L}_{\geq2}(H)}\left(\frac{2d}{n}\right)^{|E_{\geq2}^{b,o}(v,H)|-|S^o_v|},
\end{align*}
hence,
\begin{align*}
&\sum_{\substack{S\subseteq E_{\geq2}^b(H):\\\forall v\in\mathcal{L}_{\geq2}(H),\; d_{S}(v)\leq \Delta}}\left(\frac{2d}{n}\right)^{|E_{\geq2}^b(H)|-|S|}\\
&\quad\quad=\sum_{\substack{S^i\subseteq E_{\geq2}^{b,i}(H)}}\left(\frac{2d}{n}\right)^{|E_{\geq2}^{b,i}(H)|-|S^i|}\prod_{v\in\mathcal{L}_{\geq2}(H)}\left[\sum_{\substack{S^o_v\subseteq E_{\geq 2}^{b,o}(v,H):\\ |S^i_v|+|S^o_v|\leq \Delta}}\left(\frac{2d}{n}\right)^{|E_{\geq2}^{b,o}(v,H)|-|S^o_v|}\right]\\
&\quad\quad\leq\sum_{\substack{S^i\subseteq E_{\geq2}^{b,i}(H)}}\left(\sqrt{\frac{2d}{n}}\right)^{2|E_{\geq2}^{b,i}(H)|-2|S^i|}\prod_{v\in\mathcal{L}_{\geq2}(H)}\left[\sum_{\substack{S^o_v\subseteq E_{\geq 2}^{b,o}(v,H):\\ |S^i_v|+|S^o_v|\leq \Delta}}\left(\sqrt{\frac{2d}{n}}\right)^{|E_{\geq2}^{b,o}(v,H)|-|S^o_v|}\right]\\
&\quad\quad=\sum_{\substack{S^i\subseteq E_{\geq2}^{b,i}(H)}}\left(\sqrt{\frac{2d}{n}}\right)^{\sum_{v\in\mathcal{L}_{\geq2}(H)}2(|E_{\geq2}^{b,i}(v,H)|-|S^i_v|)}\prod_{v\in\mathcal{L}_{\geq2}(H)}\left[\sum_{\substack{S^o_v\subseteq E_{\geq 2}^{b,o}(v,H):\\ |S^i_v|+|S^o_v|\leq \Delta}}\left(\sqrt{\frac{2d}{n}}\right)^{|E_{\geq2}^{b,o}(v,H)|-|S^o_v|}\right].
\end{align*}

Therefore,
\begin{equation}
\label{eq:eq_lem_UH_deg2_eq4}
\begin{aligned}
\sum_{\substack{S\subseteq E_{\geq2}^b(H):\\\forall v\in\mathcal{L}_{\geq2}(H),\; d_{S}(v)\leq \Delta}}&\left(\frac{2d}{n}\right)^{|E_{\geq2}^b(H)|-|S|}\\
&\leq\sum_{\substack{S^i\subseteq E_{\geq2}^{b,i}(H)}}\prod_{v\in\mathcal{L}_{\geq2}(H)}\left[\sum_{\substack{S^o_v\subseteq E_{\geq 2}^{b,o}(v,H):\\ |S^i_v|+|S^o_v|\leq \Delta}}\left(\sqrt{\frac{2d}{n}}\right)^{|E_{\geq2}^{b,o}(v,H)|-|S^o_v|+|E_{\geq2}^{b,i}(v,H)|-|S^i_v|}\right]\\
&\leq\prod_{v\in\mathcal{L}_{\geq2}(H)}\left[\sum_{\substack{S^i_v\subseteq E_{\geq2}^{b,i}(v,H)}}\sum_{\substack{S^o_v\subseteq E_{\geq 2}^{b,o}(v,H):\\ |S^i_v|+|S^o_v|\leq \Delta}}\left(\sqrt{\frac{2d}{n}}\right)^{|E_{\geq2}^{b,o}(v,H)|-|S^o_v|+|E_{\geq2}^{b,i}(v,H)|-|S^i_v|}\right]\\
&=\prod_{v\in\mathcal{L}_{\geq2}(H)}\left[\sum_{\substack{S_v\subseteq E_{\geq2}^b(v,H):\\ |S_v|\leq \Delta}}\left(\sqrt{\frac{2d}{n}}\right)^{|E_{\geq2}^b(v,H)|-|S_v|}\right].
\end{aligned}
\end{equation}

Now for every $v\in\mathcal{L}_{\geq2}(H)$, we have
\begin{align*}
\sum_{\substack{S_v\subseteq E_{\geq2}^b(v,H):\\ |S_v|\leq \Delta}}&\left(\sqrt{\frac{2d}{n}}\right)^{|E_{\geq2}^b(v,H)|-|S_v|}\leq (\Delta+1){|E_{\geq2}^b(v,H)|\choose \Delta}\left(\sqrt{\frac{2d}{n}}\right)^{|E_{\geq2}^b(v,H)|-\Delta}\\
&\leq (\Delta+1){st\choose \Delta}\left(\sqrt{\frac{2d}{n}}\right)^{|E_{\geq2}^b(v,H)|-\Delta}\leq (\Delta+1)\frac{(st)^{\Delta}}{\Delta!}\left(\sqrt{\frac{2d}{n}}\right)^{d^H_{\geq 2}(v)-\Delta},%\leq \frac{1}{n^{\frac{1}{4}\left(d^H_{\geq 2}(v)-\Delta\right)}},
\end{align*}

By combining this with \cref{eq:eq_lem_UH_deg2_eq3} and \cref{eq:eq_lem_UH_deg2_eq4}, we get

\begin{align*}
\mathbb{E}\big[|\mathbf{Y}_{H_{\geq2}^b}|\cdot \mathbbm{1}_{\mathcal{E}_{\geq 2}^{b,H}}\big|\mathbf{x}\big]
&\leq\left(\frac{2d}{n}\right)^{|E_{\geq2}^b(H)|} \prod_{v\in\mathcal{L}_{\geq2}(H)}\left[(\Delta+1)\frac{(st)^{\Delta}}{\Delta!}\left(\sqrt{\frac{2d}{n}}\right)^{d^H_{\geq 2}(v)-\Delta}\right]\\
&\stackrel{(\dagger)}{\leq}\left(\frac{d}{n}\right)^{|E_{\geq2}^b(H)|} \prod_{v\in\mathcal{L}_{\geq2}(H)}\left[2^{d^H_{\geq 2}(v)}\cdot(\Delta+1)\frac{(st)^{\Delta}}{\Delta!}\left(\sqrt{\frac{2d}{n}}\right)^{d^H_{\geq 2}(v)-\Delta}\right]\\
&=\left(\frac{d}{n}\right)^{|E_{\geq2}^b(H)|} \prod_{v\in\mathcal{L}_{\geq2}(H)}\left[2^{\Delta}\cdot(\Delta+1)\frac{(st)^{\Delta}}{\Delta!}\left(2\sqrt{\frac{2d}{n}}\right)^{d^H_{\geq 2}(v)-\Delta}\right]\\
&\stackrel{(\ddagger)}{\leq}\left(\frac{d}{n}\right)^{|E_{\geq2}^b(H)|} \frac{1}{\resizebox{0.055\textwidth}{!}{$\displaystyle\prod_{v\in\mathcal{L}_{\geq2}(H)}$} n^{\frac{1}{4}\left(d^H_{\geq 2}(v)-\Delta\right)}},
\end{align*}
where $(\dagger)$ follows from the fact that $\displaystyle |E_{\geq2}^b(H)|\leq \sum_{v\in\mathcal{L}_{\geq2}(H)}d_{\geq 2}^H(v)$, and $(\ddagger)$ is true for $n$ large enough\footnote{Recall that $d_{\geq 2}^H(v)>\Delta$ for all $v\in\mathcal{L}_{\geq 2}(H)$}. By combining this with \cref{eq:eq_lem_UH_deg2_eq1} and \cref{eq:eq_lem_UH_deg2_eq2}, we get

\begin{align*}
\mathbb{E}\big[|\tilde{\mathbf{Y}}_{\geq 2}^H|\big|\mathbf{x}\big]&\leq \frac{1}{\resizebox{0.055\textwidth}{!}{$\displaystyle\prod_{v\in\mathcal{L}_{\geq2}(H)}$} n^{\frac{1}{4}\left(d^H_{\geq 2}(v)-\Delta\right)}}\left(\frac{d}{n}\right)^{|E_{\geq2}^b(H)|}\prod_{uv\in E_{\geq2}^a(H)}\left[\left(1+\frac{\epsilon \mathbf{x}_u\mathbf{x}_v}{2}\right)\frac{d}{n}+\frac{d^2}{n^2}\right]\\
&= \frac{1}{\resizebox{0.055\textwidth}{!}{$\displaystyle\prod_{v\in\mathcal{L}_{\geq2}(H)}$} n^{\frac{1}{4}\left(d^H_{\geq 2}(v)-\Delta\right)}}\left(\frac{d}{n}\right)^{|E_{\geq2}(H)|}\prod_{uv\in E_{\geq2}^a(H)}\left[1+\frac{\epsilon \mathbf{x}_u\mathbf{x}_v}{2}+\frac{d}{n}\right].
\end{align*}
\end{proof}

\subsubsection{Upper bounds for the contribution of the not-well-behaved event}
\label{subsubsec:subsubsec_upper_bounding_not_well_behaved}

We will prove \Cref{lem:lem_upper_bound_nice_Wc} in three steps. We start by proving an upper bound on $\left|\mathbb{E}\big[\overline{\mathbf{Y}}_H\big|\mathcal{E}_{H,b}^c\big]\cdot\mathbb{P}[\mathcal{E}_{H,b}^c]\right|$, where $\mathcal{E}_{H,b}$ is as in \cref{eq:eq_def_E_wb_H_x}.

\begin{lemma}
\label{lem:lem_Prob_Balanced_Y}
Let $H$ be a multigraph with at most $st=sK\log n$ vertices and let $\mathcal{E}_{H,b}$ be the event that $\mathbf{x}$ is approximately balanced on $[n]\setminus V(H)$. If $n$ is large enough, then
$$\mathbb{P}[\mathcal{E}_{H,b}^c]\leq 2e^{-\frac{9}{8}\sqrt{n}}.$$
\end{lemma}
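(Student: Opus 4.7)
The plan is to recast $\mathcal{E}_{H,b}^c$ as a large-deviation event for a sum of i.i.d. Rademacher random variables, and then apply Hoeffding's inequality. Write $V = [n] \setminus V(H)$, $N = |V|$, and $\mathbf{S} = \sum_{v \in V} \mathbf{x}_v$. Since $|V(H)| \leq st = sK\log n$, we have $N \geq n - sK\log n$. Let $N_+$ and $N_-$ denote the numbers of vertices in $V$ with $\mathbf{x}_v = +1$ and $\mathbf{x}_v = -1$ respectively. Then $N_+ + N_- = N$ and $N_+ - N_- = \mathbf{S}$, so
\begin{equation*}
\min(N_+, N_-) = \frac{N - |\mathbf{S}|}{2}.
\end{equation*}
The event $\mathcal{E}_{H,b}^c$ is equivalent to $\min(N_+, N_-) < \lceil \tfrac{n}{2} - n^{3/4}\rceil$, which in turn is equivalent to $|\mathbf{S}| > N - 2\lceil \tfrac{n}{2} - n^{3/4}\rceil$.

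The next step is to derive a clean lower bound on the threshold $N - 2\lceil \tfrac{n}{2} - n^{3/4}\rceil$. Using $N \geq n - sK\log n$ and $2\lceil \tfrac{n}{2} - n^{3/4}\rceil \leq n - 2n^{3/4} + 2$, we obtain
\begin{equation*}
N - 2\lceil \tfrac{n}{2} - n^{3/4}\rceil \;\geq\; 2n^{3/4} - sK\log n - 2.
\end{equation*}
Call this quantity $a_n$. For $n$ large enough, $a_n \geq 2n^{3/4} - n^{0.8}$, in particular $a_n > 0$ and $a_n = (1-o(1))\cdot 2n^{3/4}$.

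Now I apply Hoeffding's inequality to the sum $\mathbf{S}$ of $N$ independent $\{-1,+1\}$-valued Rademacher variables:
\begin{equation*}
\mathbb{P}\big[|\mathbf{S}| \geq a_n\big] \;\leq\; 2\exp\!\left(-\frac{a_n^2}{2N}\right).
\end{equation*}
Since $a_n^2 = (1-o(1))\cdot 4n^{3/2}$ and $2N \leq 2n$, we get $\tfrac{a_n^2}{2N} \geq (1-o(1)) \cdot 2\sqrt{n} \geq \tfrac{9}{8}\sqrt{n}$ for $n$ large enough. Combining with the above bound on $\mathbb{P}[\mathcal{E}_{H,b}^c]$ gives the desired inequality $\mathbb{P}[\mathcal{E}_{H,b}^c] \leq 2e^{-\tfrac{9}{8}\sqrt{n}}$.

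There is no real obstacle here: the argument is a one-shot application of Hoeffding combined with an elementary manipulation of the balancedness condition. The only care needed is in tracking the $\lceil \cdot \rceil$ and the $-sK\log n$ correction so as to see that the constant $\tfrac{9}{8}$ (rather than $2$) is what survives in the exponent once lower-order terms are absorbed; this is why the statement requires $n$ to be large enough.
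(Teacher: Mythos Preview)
Your proof is correct and essentially identical to the paper's: both recast $\mathcal{E}_{H,b}^c$ as the event that the Rademacher sum $\sum_{v\in [n]\setminus V(H)}\mathbf{x}_v$ exceeds a threshold of order $2n^{3/4}$ in absolute value, and then apply Hoeffding's inequality (the paper fixes the threshold at $\tfrac{3}{2}n^{3/4}$, which yields exactly $\tfrac{9}{8}\sqrt{n}$ in the exponent, while you keep $(1-o(1))\cdot 2n^{3/4}$ and note the resulting exponent exceeds $\tfrac{9}{8}\sqrt{n}$). One cosmetic slip: the intermediate inequality $a_n \geq 2n^{3/4} - n^{0.8}$ is true but vacuous since $n^{0.8}$ dominates $2n^{3/4}$; what you actually use, and what is correct, is $a_n = (1-o(1))\cdot 2n^{3/4}$.
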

\begin{proof}
Define
$$S_{[n]\setminus V(H)}=\sum_{v\in [n]\setminus V(H)}\mathbf{x}_v,$$
where the sum is performed according to the arithmetic of integers in $\mathbb{Z}$. It is easy to see that if $\mathcal{E}_{H,b}^c$ occurs, then $|S_{[n]\setminus V(H)}|>  n-|V(H)|-2\left\lceil\frac{n}{2}- n^{\frac{3}{4}}\right\rceil$. Therefore,
\begin{align*}
\mathbb{P}[\mathcal{E}_{H,b}^c]
&\leq\resizebox{0.85\textwidth}{!}{$\mathbb{P}\left[|S_{[n]\setminus V(H)}|\geq n-|V(H)|-2\left\lceil\frac{n}{2}- n^{\frac{3}{4}}\right\rceil\right]\leq\mathbb{P}\left[|S_{[n]\setminus V(H)}|\geq  n-st -2\left(\frac{n}{2}- n^{\frac{3}{4}} +1\right)\right]$}\\
&=\mathbb{P}\left[|S_{[n]\setminus V(H)}|\geq  2\left(n^{\frac{3}{4}} -\frac{st}{2} -1\right)\right]\leq\mathbb{P}\left[|S_{[n]\setminus V(H)}|\geq  \frac{3}{2} \cdot n^{\frac{3}{4}}\right],
\end{align*}
where the last inequality is true for $n$ large enough. By applying Hoeffding's inequality, we get:
\begin{align*}
\mathbb{P}[\mathcal{E}_{H,b}^c]
&\leq 2\cdot e^{-2\frac{\left(\frac{3}{2}\cdot  n^{\frac{3}{4}}\right)^2}{(n-|V(H)|)\cdot (1-(-1))^2}}= 2e^{-\frac{18 n^{\frac{3}{2}}}{16(n-|V(H)|)}}\leq 2 e^{-\frac{9 n^{\frac{3}{2}}}{8n}}=  2e^{-\frac{9}{8}\sqrt{n}}.
\end{align*}
\end{proof}

\begin{lemma}
\label{lem:lem_upper_bound_nice_y}
Let $H$ be a multigraph with at most $st=sK\log n$ vertices and at most $st$ multi-edges, and assume that $E_1^a(H)=\varnothing$. If $n$ is large enough, then
$$\left|\mathbb{E}\big[\overline{\mathbf{Y}}_H\big|\mathcal{E}_{H,b}^c\big]\cdot\mathbb{P}[\mathcal{E}_{H,b}^c]\right|\leq e^{-\sqrt{n}}\cdot\left(\frac{\epsilon d}{2n}\right)^{|E_1(H)|}\cdot\left(\frac{d}{n}\right)^{|E_{\geq 2}(H)|}.$$
\end{lemma}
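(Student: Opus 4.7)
The plan is to start from the identity
\[
\mathbb{E}\big[\overline{\mathbf{Y}}_H\big|\mathcal{E}_{H,b}^c\big]\cdot\mathbb{P}[\mathcal{E}_{H,b}^c]=\mathbb{E}\big[\overline{\mathbf{Y}}_H\cdot\mathbbm{1}_{\mathcal{E}_{H,b}^c}\big],
\]
bound the absolute value by pulling it inside the expectation, condition on $\mathbf{x}$, and exploit the fact that $\mathcal{E}_{H,b}$ is $\sigma(\mathbf{x})$-measurable. This gives
\[
\left|\mathbb{E}\big[\overline{\mathbf{Y}}_H\cdot\mathbbm{1}_{\mathcal{E}_{H,b}^c}\big]\right|\leq \mathbb{E}\left[\left|\mathbb{E}\big[\overline{\mathbf{Y}}_H\big|\mathbf{x}\big]\right|\cdot\mathbbm{1}_{\mathcal{E}_{H,b}^c}\right]\leq \mathbb{E}\left[\overline{U}_H(\mathbf{x})\cdot\mathbbm{1}_{\mathcal{E}_{H,b}^c}\right],
\]
where the last inequality is \Cref{lem:lem_UH}, valid since $H$ has at most $st$ vertices and at most $st$ multi-edges.

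Next I would absorb $\overline{U}_H(\mathbf{x})$ into a deterministic quantity. By the assumption $E_1^a(H)=\varnothing$, the factor $(6/\epsilon)^{|E_1^a(H)|}$ collapses to $1$; the $\mathcal{L}_{\geq 2}(H)$ factor in the denominator is at most $1$ (each exponent is nonnegative since $d^H_{\geq 2}(v)>\Delta$ for $v\in\mathcal{L}_{\geq 2}(H)$); and the product over $E_{\geq 2}^a(H)$ is bounded above by $(2+o(1))^{|E_{\geq 2}^a(H)|}\leq 3^{st}$ for $n$ large, since $\epsilon\leq 2$ and $|E_{\geq 2}^a(H)|\leq st$. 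Consequently
\[
\overline{U}_H(\mathbf{x})\leq n^{\frac{2K}{A}}\cdot 3^{st}\cdot\left(\frac{\epsilon d}{2n}\right)^{|E_1(H)|}\left(\frac{d}{n}\right)^{|E_{\geq 2}(H)|}
\]
uniformly in $\mathbf{x}$.

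Combining these two bounds with \Cref{lem:lem_Prob_Balanced_Y} yields
\[
\left|\mathbb{E}\big[\overline{\mathbf{Y}}_H\big|\mathcal{E}_{H,b}^c\big]\cdot\mathbb{P}[\mathcal{E}_{H,b}^c]\right|\leq 2e^{-\frac{9}{8}\sqrt{n}}\cdot n^{\frac{2K}{A}}\cdot 3^{st}\cdot\left(\frac{\epsilon d}{2n}\right)^{|E_1(H)|}\left(\frac{d}{n}\right)^{|E_{\geq 2}(H)|}.
\]
Since $st=sK\log n$, the prefactor $n^{\frac{2K}{A}}\cdot 3^{st}=n^{\frac{2K}{A}+sK\log 3}$ is only polynomial in $n$, whereas the Hoeffding factor $2e^{-\frac{9}{8}\sqrt{n}}$ decays stretched-exponentially. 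Thus for $n$ sufficiently large we have $2\,n^{\frac{2K}{A}}\cdot 3^{st}\cdot e^{-\frac{9}{8}\sqrt{n}}\leq e^{-\sqrt{n}}$, which delivers the claimed inequality. There is no real obstacle here: the whole argument is a packaging of the universal pointwise bound \Cref{lem:lem_UH} together with the tail bound on the balance event, and the only care needed is the observation that both the factor $(6/\epsilon)^{|E_1^a(H)|}$ and the large-degree correction drop out under the hypothesis $E_1^a(H)=\varnothing$.
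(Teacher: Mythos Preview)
Your proof is correct and follows essentially the same approach as the paper: both use the $\sigma(\mathbf{x})$-measurability of $\mathcal{E}_{H,b}$ to reduce to the pointwise bound $\overline{U}_H(\mathbf{x})$ from \Cref{lem:lem_UH}, simplify $\overline{U}_H(\mathbf{x})$ using $E_1^a(H)=\varnothing$ and the trivial bounds on the $\mathcal{L}_{\geq 2}$ and $E_{\geq 2}^a$ factors to obtain a polynomial-in-$n$ prefactor, and then absorb that prefactor into the stretched-exponential tail from \Cref{lem:lem_Prob_Balanced_Y}. One minor imprecision in your closing sentence: the $\mathcal{L}_{\geq 2}$ correction is bounded by $1$ in general (it sits in the denominator with nonnegative exponent), not specifically because $E_1^a(H)=\varnothing$; but you state this correctly in the body of the argument.
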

\begin{proof}
Since $E_1^a(H)=\varnothing$, it follows from \Cref{def:def_upper_bound_UH} and \Cref{lem:lem_UH} that for $n$ large enough, we have
\begin{align*}
\big|\mathbb{E}\big[\overline{\mathbf{Y}}_H\big| \mathbf{x}\big]\big|&\leq \overline{U}_H(\mathbf{x})\\
&=\resizebox{0.85\textwidth}{!}{$\displaystyle n^{\frac{2K}{A}}\left(\frac{6}{\epsilon}\right)^{|E_1^a(H)|}\frac{1}{\resizebox{0.07\textwidth}{!}{$\displaystyle\prod_{v\in\mathcal{L}_{\geq2}(H)}$} n^{\frac{1}{4}\left(d^H_{\geq 2}(v)-\Delta\right)}}\left(\frac{\epsilon d}{2n}\right)^{|E_1(H)|}\left(\frac{d}{n}\right)^{|\E_{\geq2}(H)|}\prod_{uv\in E_{\geq2}^a(H)}\left[1+\frac{\epsilon \mathbf{x}_u\mathbf{x}_v}{2}+\frac{3d}{\sqrt{n}}\right].$}\\
&\stackrel{(\ast)}{\leq} n^{\frac{2K}{A}}\left(\frac{\epsilon d}{2n}\right)^{|E_1(H)|}\left(\frac{d}{n}\right)^{|E_{\geq2}(H)|}3^{|E_{\geq 2}^a(H)|}\leq n^{\frac{2K}{A}}\left(\frac{\epsilon d}{2n}\right)^{|E_1(H)|}\left(\frac{d}{n}\right)^{|E_{\geq 2}(H)|}\cdot 3^{st},
\end{align*}
where $(\ast)$ is true for $n$ large enough. Since $t=K\cdot \log n$, we have $3^{st}=e^{(\log 3)sK\log n}=n^{sK\log 3}$. Therefore,

\begin{align*}
\big|\mathbb{E}\big[\overline{\mathbf{Y}}_H\big| \mathbf{x}\big]\big|\leq n^{\frac{2K}{A}+sK\log 3}\left(\frac{\epsilon d}{2n}\right)^{|E_1(H)|}\left(\frac{d}{n}\right)^{|E_{\geq 2}(H)|}.
\end{align*}

Now since the event $\mathcal{E}_{H,b}^c$ depends only on $\mathbf{x}$, i.e., it is $\sigma(\mathbf{x})$-measurable, we deduce that for $n$ large enough, we have

\begin{align*}
\left|\mathbb{E}\big[\overline{\mathbf{Y}}_H\big|\mathcal{E}_{H,b}^c\big]\cdot\mathbb{P}[\mathcal{E}_{H,b}^c]\right|&\leq n^{\frac{2K}{A}+sK\log 3}\left(\frac{\epsilon d}{2n}\right)^{|E_1(H)|}\left(\frac{d}{n}\right)^{|E_{\geq 2}(H)|} \cdot\mathbb{P}[\mathcal{E}_{H,b}^c]\\
&\stackrel{(\dagger)}{\leq} n^{\frac{2K}{A}+sK\log 3}\left(\frac{\epsilon d}{2n}\right)^{|E_1(H)|}\left(\frac{d}{n}\right)^{|E_{\geq 2}(H)|}\cdot 2e^{-\frac{9}{8}\sqrt{n}}\\
&\stackrel{(\ddagger)}{\leq} \left(\frac{\epsilon d}{2n}\right)^{|E_1(H)|}\left(\frac{d}{n}\right)^{|E_{\geq 2}(H)|}\cdot e^{-\sqrt{n}},
\end{align*}
where $(\dagger)$ follows from \Cref{lem:lem_Prob_Balanced_Y} and $(\ddagger)$ is true for $n$ large enough.
\end{proof}

Next, we prove an upper bound on $\left|\mathbb{E}\big[\overline{\mathbf{Y}}_H\big|\mathcal{E}_{H,b\overline{g}}\big]\cdot\mathbb{P}[\mathcal{E}_{H,b\overline{g}}]\right|$, where $\mathcal{E}_{H,b\overline{g}}$ is as in \cref{eq:eq_def_E_wb_H_xog}. We will need the following definition:

\begin{definition}
\label{def:def_unlucky}
Assume that $\mathbf{x}$ is approximately balanced on $[n]\setminus V(H)$. We say that a vertex $v\in V(H)$ is $(\mathbf{G},\mathbf{x},H)$-\emph{lucky} if it satisfies the following two conditions:
\begin{itemize}
\item[(1)] There is no $H$-cross-edge that is present in $\mathbf{G}$, and which is incident to it. In other words\footnote{Recall \Cref{def:def_in_out_degree}}, $d_{\mathbf{G}-G(H)}^i(v)=0$.
\item[(2)] In the sampled graph $\mathbf{G}$, the vertex $v$ is not adjacent to any vertex in $\big([n]\setminus V(H)\big)\setminus V_b(H,\mathbf{x})$.
\end{itemize}

We say that $v\in V(H)$ is $(\mathbf{G},\mathbf{x},H)$-\emph{unlucky} if it is not $(\mathbf{G},\mathbf{x},H)$-\emph{lucky}.

A subset of $V(H)$ is said to be \emph{completely} $(\mathbf{G},\mathbf{x},H)$-\emph{lucky} if all the vertices in it are $(\mathbf{G},\mathbf{x},H)$-\emph{lucky}. We say that it is \emph{completely} $(\mathbf{G},\mathbf{x},H)$-\emph{unlucky} if all the vertices in it are $(\mathbf{G},\mathbf{x},H)$-\emph{unlucky}.

If $\mathbf{G},\mathbf{x}$ and $H$ are clear from the context, we drop $(\mathbf{G},\mathbf{x},H)$ and simply write lucky, unlucky, completely lucky, and completely unlucky.
\end{definition}

\begin{lemma}
\label{lem:lem_bound_prob_completely_unlucky}
Let $H$ be a multigraph with at most $st=sK\log n$ vertices. If $n$ is large enough, then for every $S\subseteq V(H)$, the conditional probability that $S$ is completely $(\mathbf{G},\mathbf{x},H)$-unlucky given $\mathbf{x}$ that is approximately balanced on $[n]\setminus V(H)$ can be upper bounded by:
$$\mathbb{P}\left[\big\{S\text{ is completely }(\mathbf{G},\mathbf{x},H)\text{-unlucky}\big\}\middle| \mathbf{x},\mathcal{E}_{H,b}\right]\leq \frac{1}{n^{|S|/5}}.$$
\end{lemma}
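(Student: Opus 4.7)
The plan is to decompose the event $\{S\text{ is completely unlucky}\}$ according to which vertices fail each of the two conditions defining unluckiness in \Cref{def:def_unlucky}. Since every unlucky vertex must fail condition~(1) or condition~(2), for every outcome in which $S$ is completely unlucky there exists a (not necessarily unique) subset $S'\subseteq S$ such that every $v\in S'$ fails condition~(1) --- i.e. $S'$ is completely $H$-crossing in the sense of \Cref{def:def_crossing} --- and every $v\in S\setminus S'$ fails condition~(2) --- i.e.\ has at least one edge in $\mathbf{G}$ going to $([n]\setminus V(H))\setminus V_b(H,\mathbf{x})$. Taking a union bound over $S'$ therefore gives
\[
\mathbb{P}\!\left[S\text{ completely unlucky}\mid\mathbf{x},\mathcal{E}_{H,b}\right]
\leq\sum_{S'\subseteq S}\mathbb{P}\!\left[S'\text{ completely crossing}\mid\mathbf{x}\right]\cdot\prod_{v\in S\setminus S'}\mathbb{P}\!\left[v\text{ fails (2)}\mid\mathbf{x},\mathcal{E}_{H,b}\right],
\]
where the factorization over $v\in S\setminus S'$ uses that the failure-of-(2) events concern disjoint sets of edges (one set per vertex, all going outside $V(H)$), and are disjoint from the $H$-cross-edges, so given $\mathbf{x}$ these events are conditionally mutually independent and independent of $\{S'\text{ completely crossing}\}$; moreover $\mathcal{E}_{H,b}$ is $\sigma(\mathbf{x})$-measurable.

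The two ingredient bounds are already available or immediate. The first factor is controlled by \Cref{lem:lem_bound_prob_completely_crossing}, yielding
$\mathbb{P}[S'\text{ completely crossing}\mid\mathbf{x}]\leq (2ds^2t^2/n)^{|S'|/2}$.
For the second factor, note that under $\mathcal{E}_{H,b}$ we have $|V_b(H,\mathbf{x})|=2\lceil n/2-n^{3/4}\rceil$, so $|([n]\setminus V(H))\setminus V_b(H,\mathbf{x})|\leq 2n^{3/4}$, and since each such edge is present in $\mathbf{G}$ with conditional probability at most $2d/n$, a union bound gives
$\mathbb{P}[v\text{ fails (2)}\mid\mathbf{x},\mathcal{E}_{H,b}]\leq 2n^{3/4}\cdot (2d/n)=4d/n^{1/4}$.

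Plugging these in and recognizing the binomial expansion,
\[
\mathbb{P}\!\left[S\text{ completely unlucky}\mid\mathbf{x},\mathcal{E}_{H,b}\right]
\leq\left(\sqrt{\tfrac{2ds^2t^2}{n}}+\tfrac{4d}{n^{1/4}}\right)^{|S|}.
\]
Because $t=K\log n$, we have $\sqrt{2ds^2t^2/n}=O((\log n)/\sqrt{n})=o(n^{-1/4})$, so for $n$ large enough the right-hand side is at most $(5d/n^{1/4})^{|S|}$. Finally, $5d\leq n^{1/20}$ for $n$ large enough, hence $5d/n^{1/4}\leq n^{-1/5}$, which yields the claimed bound $n^{-|S|/5}$.

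The only subtle step is the conditional independence used to factor the probability of the joint event over $v\in S\setminus S'$; I don't expect this to be an obstacle because the underlying edge sets (each $v$ to $([n]\setminus V(H))\setminus V_b(H,\mathbf{x})$) are pairwise disjoint and disjoint from $H$-cross-edges, and conditioning on $\mathcal{E}_{H,b}$ only conditions on $\mathbf{x}$. Everything else is a routine combination of \Cref{lem:lem_bound_prob_completely_crossing}, a simple union bound on outside-edges, and the binomial identity.
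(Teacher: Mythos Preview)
Your proof is correct and follows essentially the same approach as the paper's: both decompose via a union bound over the subset $S'\subseteq S$ of vertices that fail condition~(1) (are $H$-crossing) versus those that fail condition~(2), invoke \Cref{lem:lem_bound_prob_completely_crossing} for the first factor and a simple union bound over the at most $2n^{3/4}$ outside vertices for the second, and collapse the sum via the binomial identity. The paper's constants differ slightly (it bounds both terms uniformly by $8dst/n^{1/4}$ and arrives at $(16dst/n^{1/4})^{|S|}$), but the structure is identical, including your observation that $\mathcal{E}_{H,b}$ is $\sigma(\mathbf{x})$-measurable so the conditioning is harmless for the independence argument.
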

\begin{proof}
Given $\mathbf{x}$, the conditional probability that any particular edge is present in $\mathbf{G}$ is at most
$$\left(1+\frac{\epsilon}{2}\right)\frac{d}{n}\leq \frac{2d}{n}.$$

A vertex can be unlucky either because it is incident to an $H$-cross-edge, or because it is adjacent to a vertex in $\big([n]\setminus V(H)\big)\setminus V_b(H,\mathbf{x})$.

Since $|\big([n]\setminus V(H)\big)\setminus V_b(H,\mathbf{x})|\leq 2\left\lceil n^{\frac{3}{4}}\right\rceil$, it follows from the union bound that, given $\mathbf{x}$ that is approximately balanced on $[n]\setminus V(H)$, the conditional probability that any particular vertex $v\in S$ is adjacent to some vertex in $\big([n]\setminus V(H)\big)\setminus V_b(H,\mathbf{x})$ is at most
$$2\left\lceil n^{\frac{3}{4}}\right\rceil\frac{2d}{n}\leq \frac{8 n^{\frac{3}{4}}d}{n}=\frac{8d}{n^{\frac{1}{4}}}.$$

Let $S'\subseteq S$ be the set of vertices in $S$ that are not adjacent to any vertex in $\big([n]\setminus V(H)\big)\setminus V_b(H,\mathbf{x})$. If $S$ is completely $(\mathbf{G},\mathbf{x},H)$-unlucky, then for every $v\in S'$, we have at least one $H$-cross-edge that is present in $\mathbf{G}$, and which is incident to it. Therefore, we have at least $\left\lceil\frac{|S'|}2\right\rceil$ $H$-cross-edges that are present in $\mathbf{G}$. Since we have at most $|V(H)|^2\leq s^2t^2$ $H$-cross-edges, the number of collections of $H$-cross-edges of size $\left\lceil\frac{|S'|}2\right\rceil$ is at most
$${s^2t^2\choose \left\lceil\frac{|S'|}2\right\rceil}\leq (s^2t^2)^{\left\lceil\frac{|S'|}2\right\rceil}.$$ 

Therefore, for $n$ large enough, given $\mathbf{x}$ that is approximately balanced on $[n]\setminus V(H)$, the conditional probability that $S$ is completely unlucky can be upper bounded by:
\begin{align*}
\mathbb{P}&\left[\big\{S\text{ is completely }(\mathbf{G},\mathbf{x},H)\text{-unlucky}\big\}\middle| \mathcal{E}_{H,b}\right]\\
&\quad\quad\quad\quad\leq \sum_{S'\subseteq S}\left(\frac{8d}{n^{\frac{1}{4}}}\right)^{|S|-|S'|}\cdot  (s^2t^2)^{\left\lceil\frac{|S'|}{2}\right\rceil}\left(\frac{2d}{n}\right)^{\left\lceil\frac{|S'|}{2}\right\rceil}\stackrel{(\ast)}{\leq} \sum_{S'\subseteq S}\left(\frac{8d}{n^{\frac{1}{4}}}\right)^{|S|-|S'|}\cdot  \left(\frac{2ds^2t^2}{n}\right)^{\frac{|S'|}{2}}\\
&\quad\quad\quad\quad= \sum_{S'\subseteq S}\left(\frac{8d}{n^{\frac{1}{4}}}\right)^{|S|-|S'|}\cdot  \left(\frac{\sqrt{2d}\cdot st}{n^{\frac{1}{2}}}\right)^{|S'|}\leq \sum_{S'\subseteq S}\left(\frac{8dst}{n^{\frac{1}{4}}}\right)^{|S|-|S'|}\cdot  \left(\frac{8d st}{n^{\frac{1}{4}}}\right)^{|S'|}= \sum_{S'\subseteq S}\left(\frac{8dst}{n^{\frac{1}{4}}}\right)^{|S|}\\
&\quad\quad\quad\quad= 2^{|S|}\left(\frac{8dst}{n^{\frac{1}{4}}}\right)^{|S|}= \left(\frac{16dst}{n^{\frac{1}{4}}}\right)^{|S|}\stackrel{(\dagger)}{\leq}\frac{1}{n^{|S|/5}},
\end{align*}
where $(\ast)$ and $(\dagger)$ are true for $n$ large enough.
\end{proof}

%\begin{lemma}
%\label{lem:lem_deg_nbsaw}
%For every $H\in\nbsaw{s}{t}$ and every $v\in V(H)$, we have $d_{G(H)}(v)\leq\frac{\Delta}{2}$.
%\end{lemma}
%\begin{proof}
%Since $H\in\nbsaw{s}{t}$, we have $d^H_{1}(v)\leq 2$ and $d^H_{\geq 2}(v)\leq \frac{\Delta}{4}$ for every $v\in V(H)$. Therefore,
%$$d_{G(H)}(v)=d^H_{1}+d^H_{\geq 2}(v)\leq 2+\frac{\Delta}{4}\leq\frac{\Delta}{2}.$$
%\end{proof}

\begin{lemma}
\label{lem:lem_prob_completely_unsafe_nice}
Let $H$ be a multigraph with at most $st=sK\log n$ vertices and at most $st$ multi-edges. Assume that $\mathcal{L}_1(H)=\varnothing$. Let $V\subseteq \mathcal{S}_{\geq 2}(H)$, and for every $v\in V$, define
$$d_{\mathbf{G}-G(H),b}^o(v)=\big|\big\{u\in V_b(H,\mathbf{x}):\;uv\in\mathbf{G}\big\}\big|,$$
and
$$d_{\mathbf{G}-G(H),\overline{b}}^o(v)=\big|\big\{u\in\big([n]\setminus V(H)\big)\setminus V_b(H,\mathbf{x}):\;uv\in\mathbf{G}\big\}\big|.$$
Clearly\footnote{Recall \Cref{def:def_in_out_degree}}, $d_{\mathbf{G}-G(H)}^o(v)=d_{\mathbf{G}-G(H),b}^o(v)+d_{\mathbf{G}-G(H),\overline{b}}^o(v)$. Let $\mathcal{E}$ be an event such that:
\begin{itemize}
\item $\mathcal{E}$ implies that $\mathbf{x}$ is approximately balanced on $[n]\setminus V(H)$, i.e., $\mathcal{E}_{H,b}\subseteq \mathcal{E}$.
\item $\mathcal{E}$ implies that $V$ is completely $(\mathbf{G},H,\mathbf{x})$-lucky, i.e., $d_{\mathbf{G}-G(H)}^i(v)=d_{\mathbf{G}-G(H),\overline{b}}^o(v)=0$ for every $v\in V$.
%\item $\mathcal{E}$ implies that $\mathbf{G}$ contains no $H$-cross-edges that are incident to a vertex in $V$. In other words, $d_{\mathbf{G}-G(H)}^i(v)=0$ for every $v\in V$.
%\item $\mathcal{E}$ implies that $d_{\mathbf{G}-G(H),\overline{b}}^o(v)=0$ for every $v\in V$.
\item Given $\mathbf{x}$, the event $\mathcal{E}$ is conditionally independent from $\big(d_{\mathbf{G}-G(H),b}^o(v)\big)_{v\in V}$.
\end{itemize}
Then
$$\mathbb{P}\big[\big\{V\text{ is completely }H\text{-unsafe in }\mathbf{G}\big\}\big|\mathbf{x},\mathcal{E}\big]\leq\left(\frac{\eta}{2}\right)^{|V|},$$
where $\eta$ is as in \Cref{lem:lem_bound_prob_large_outside_deg}.
\end{lemma}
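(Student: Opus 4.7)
The plan is to rewrite the unsafeness event purely in terms of the random variables $\big(d_{\mathbf{G}-G(H),b}^o(v)\big)_{v\in V}$, use the hypothesis that $\mathcal{E}$ is conditionally independent from this vector given $\mathbf{x}$ to strip off the conditioning on $\mathcal{E}$, and then exploit the mutual independence of its coordinates together with a Chernoff-type tail bound in the spirit of \Cref{lem:lem_bound_prob_large_outside_deg}.

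The first step is to observe that, under $\mathcal{E}$, every $v\in V$ is lucky, i.e.\ $d_{\mathbf{G}-G(H)}^i(v)=0$ and $d_{\mathbf{G}-G(H),\overline{b}}^o(v)=0$, so that $d_{\mathbf{G}-G(H)}(v)=d_{\mathbf{G}-G(H),b}^o(v)$. Moreover, since $\mathcal{L}_1(H)=\varnothing$ we have $V\subseteq\mathcal{S}_1(H)\cap\mathcal{S}_{\geq 2}(H)$, so $d_{G(H)}(v)\le \tau+\tfrac{\Delta}{4}\le \tfrac{\Delta}{2}$ as in the proof of \Cref{lem:lem_bound_prob_completely_unsafe}. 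Consequently the event ``$v$ is unsafe'' is contained in $\big\{d_{\mathbf{G}-G(H),b}^o(v)>\tfrac{\Delta}{2}\big\}$, and therefore
\[
\mathbb{P}\big[\{V\text{ is completely unsafe}\}\big|\mathbf{x},\mathcal{E}\big]\le\mathbb{P}\Big[\forall v\in V,\ d_{\mathbf{G}-G(H),b}^o(v)>\tfrac{\Delta}{2}\,\Big|\,\mathbf{x},\mathcal{E}\Big].
\]

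The second step is to remove the conditioning on $\mathcal{E}$. The third hypothesis of the lemma states that $\mathcal{E}$ and $\big(d_{\mathbf{G}-G(H),b}^o(v)\big)_{v\in V}$ are conditionally independent given $\mathbf{x}$, so the right-hand side above equals $\mathbb{P}\big[\forall v\in V,\ d_{\mathbf{G}-G(H),b}^o(v)>\tfrac{\Delta}{2}\,\big|\,\mathbf{x}\big]$. The third step is to use the fact that, for distinct $v,v'\in V$, the random variables $d_{\mathbf{G}-G(H),b}^o(v)$ and $d_{\mathbf{G}-G(H),b}^o(v')$ depend on disjoint families of edge indicators $\big(\mathbbm{1}_{\{uv\in\mathbf{G}\}}\big)_{u\in V_b(H,\mathbf{x})}$ and $\big(\mathbbm{1}_{\{uv'\in\mathbf{G}\}}\big)_{u\in V_b(H,\mathbf{x})}$ (note that $V_b(H,\mathbf{x})$ is determined by $\mathbf{x}$ alone, so we may treat it as fixed after conditioning). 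These families are conditionally mutually independent given $\mathbf{x}$, so the joint probability factorises over $v\in V$.

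The last and main step is to bound each factor by $\eta/2$. Here we use that $\mathcal{E}$ implies $\mathbf{x}$ is approximately balanced on $[n]\setminus V(H)$, so $V_b(H,\mathbf{x})$ is exactly balanced: it contains the same number of vertices of each community. Consequently, for every $v\in V$,
\[
\mathbb{E}\big[d_{\mathbf{G}-G(H),b}^o(v)\,\big|\,\mathbf{x}\big]=|V_b(H,\mathbf{x})|\cdot\tfrac{d}{n}=(1\pm o(1))d,
\]
independent of the label $\mathbf{x}_v$. Since $\tfrac{\Delta}{2}\ge 64e^4d^4$ is much larger than $d$, the same union-bound / binomial-tail argument that established \Cref{lem:lem_bound_prob_large_outside_deg} applies verbatim and yields $\mathbb{P}\big[d_{\mathbf{G}-G(H),b}^o(v)>\tfrac{\Delta}{2}\,\big|\,\mathbf{x}\big]\le \eta/2$. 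Multiplying these bounds over $v\in V$ gives the desired $(\eta/2)^{|V|}$. The only subtle point is to verify that the factor-of-two improvement (from $\eta$ in \Cref{lem:lem_bound_prob_completely_unsafe} to $\eta/2$ here) is really afforded by the fact that after the lucky-conditioning there is no need to pay the crossing price from \Cref{lem:lem_bound_prob_completely_crossing}; this is where luckiness is crucially used, and is the main technical step in the proof.
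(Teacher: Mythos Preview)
Your proposal is correct and follows essentially the same route as the paper: reduce unsafeness to a threshold event on $d_{\mathbf{G}-G(H),b}^o(v)$ via luckiness, strip off $\mathcal{E}$ using the conditional-independence hypothesis, factorise over $v\in V$, and bound each factor by $\eta/2$.

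One small simplification the paper makes in the final step: rather than re-running the binomial tail estimate for $d_{\mathbf{G}-G(H),b}^o(v)$ (and invoking the balance of $V_b(H,\mathbf{x})$, which is actually irrelevant here), it simply uses the trivial domination $d_{\mathbf{G}-G(H),b}^o(v)\le d_{\mathbf{G}-G(H)}^o(v)$ and then applies \Cref{lem:lem_bound_prob_large_outside_deg} as a black box. This avoids any need to discuss the structure of $V_b(H,\mathbf{x})$; the bound from that lemma holds uniformly in $\mathbf{x}$ since each edge probability is at most $2d/n$ regardless of community labels. Your proof works, but this shortcut makes the last step cleaner.
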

\begin{proof}
Since $d_{\mathbf{G}-G(H)}^i(v)=d_{\mathbf{G}-G(H),\overline{b}}^o(v)=0$ for every $v\in V$, then $v$ is unsafe if and only if $d_{\mathbf{G}-G(H),b}^o(v)> \Delta - d_{G(H)}(v)$. Therefore, for every $V\subseteq V(H)$, we have
\begin{align*}
\mathbb{P}\big[\big\{V\text{ is completely unsafe}\big\}\big|\mathbf{x},\mathcal{E}\big]&= \mathbb{P}\big[\big\{d_{\mathbf{G}-G(H),b}^o(v)> \Delta - d_{G(H)}(v),\;\forall v\in V\big\}\big|\mathbf{x},\mathcal{E}\big]\\
&\stackrel{(\ast)}{=} \mathbb{P}\big[\big\{d_{\mathbf{G}-G(H),b}^o(v)> \Delta - d_{G(H)}(v),\;\forall v\in V\big\}\big|\mathbf{x}\big]\\
&= \prod_{v\in V} \mathbb{P}\big[d_{\mathbf{G}-G(H),b}^o(v)> \Delta - d_{G(H)}(v)\big|\mathbf{x}\big]\\
&\leq \prod_{v\in V} \mathbb{P}\big[d_{\mathbf{G}-G(H)}^o(v)> \Delta - d_{G(H)}(v)\big|\mathbf{x}\big],
\end{align*}
where $(\ast)$ follows from the fact that given $\mathbf{x}$, the event $\mathcal{E}$ is conditionally independent from $\big(d_{\mathbf{G}-G(H),b}^o(v)\big)_{v\in V(H)}$.

Now since $V\subset \mathcal{S}_{\geq 2}(H)$ and since $\mathcal{L}_1(H)=\varnothing$, we have $d_{G(H)}(v)=d_1^H(v)+d_{\geq 2}^H(v)\leq \frac{\Delta}{4}+\tau\leq\frac{\Delta}{2}$. Therefore,
\begin{equation}
\label{eq:eq_lem_upper_bound_nice_g_eq_4}
\begin{aligned}
\mathbb{P}\big[\big\{V\text{ is completely unsafe}\big\}\big|\mathbf{x},\mathcal{E}\big]&\leq  \prod_{v\in V} \mathbb{P}\big[d_{\mathbf{G}-G(H)}^o(v)> \Delta - d_{G(H)}(v)\big|\mathbf{x}\big]\\
&\leq  \prod_{v\in V} \mathbb{P}\big[d_{\mathbf{G}-G(H)}^o(v)> \frac{\Delta}{2}\big|\mathbf{x}\big]\stackrel{(\dagger)}{\leq}  \left(\frac{\eta}{2}\right)^{|V|},\\
\end{aligned}
\end{equation}
where $(\dagger)$ follows from \Cref{lem:lem_bound_prob_large_outside_deg}.
\end{proof}

\begin{lemma}
\label{lem:lem_upper_bound_nice_g}
Let $H$ be a multigraph with at most $st=sK\log n$ vertices and at most $st$ multi-edges. Assume that $\mathcal{L}_1(H)=\mathcal{L}_{\geq 2}(H)=\varnothing$, and that $E_{\geq 2}(H)$ forms a forest. If $A>\max\{100K,1\}$ and $n$ is large enough, then we have
$$\left|\mathbb{E}\big[\overline{\mathbf{Y}}_H\big|\mathcal{E}_{H,b\overline{g}}\big]\cdot\mathbb{P}[\mathcal{E}_{H,b\overline{g}}]\right|\leq \frac{1}{n^{\frac{1}{6}}}\cdot \left(\frac{\epsilon d}{2n}\right)^{|E_1(H)|}\cdot\left(\frac{d}{n}\right)^{|E_{\geq 2}(H)|},$$
where $\mathcal{E}_{H,b\overline{g}}$ is an in \cref{eq:eq_def_E_wb_H_xog}.
\end{lemma}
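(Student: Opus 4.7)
The plan is to decompose the event $\mathcal{E}_{H,b\overline{g}}$ according to the exact set of $(\mathbf{G},\mathbf{x},H)$-unlucky vertices in $V(H)$, then apply the conditional bound from \Cref{lem:lem_UH_deg1_common_2} on a ``good'' subset of safe vertices and the probability bound from \Cref{lem:lem_bound_prob_completely_unlucky} to control how often the complementary set is unlucky. For each nonempty $S\subseteq V(H)$, let $\mathcal{E}_S$ be the event that the set of unlucky vertices in $V(H)$ is exactly $S$. Since $\mathcal{E}_{H,g}^c$ requires at least one unlucky vertex, we can write
\[
\mathbb{E}\big[\overline{\mathbf{Y}}_H\big|\mathcal{E}_{H,b\overline{g}}\big]\cdot\mathbb{P}[\mathcal{E}_{H,b\overline{g}}]
= \sum_{\substack{S\subseteq V(H)\\ S\neq\varnothing}} \mathbb{E}_{\mathbf{x}}\Big[\mathbbm{1}_{\mathcal{E}_{H,b}}(\mathbf{x})\cdot \mathbb{E}\big[\overline{\mathbf{Y}}_H\big|\mathbf{x},\mathcal{E}_{H,b},\mathcal{E}_S\big]\cdot \mathbb{P}[\mathcal{E}_S|\mathbf{x},\mathcal{E}_{H,b}]\Big].
\]

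For a fixed nonempty $S$, I would apply \Cref{lem:lem_UH_deg1_common_2} to the event $\mathcal{E}=\mathcal{E}_{H,b}\cap\mathcal{E}_S$ with $U=\mathcal{S}(H)\setminus S$. One first checks that $\mathcal{E}$ is $\sigma(\mathbf{x},\mathbf{G}-E_1(H))$-measurable and is conditionally independent of $(\mathbbm{1}_{\{uv\in\mathbf{G}\}})_{uv\in E_{\geq 2}(H)}$ given $\mathbf{x}$ (all the relevant edges on which $\mathcal{E}_S$ depends are incident to at least one vertex of $V(H)$ via $H$-cross-edges or point from $V(H)$ to $([n]\setminus V(H))\setminus V_b(H,\mathbf{x})$, so they lie outside $E(H)$). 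Property (b) (the completely-unsafe bound) on $U=\mathcal{S}(H)\setminus S$ follows from \Cref{lem:lem_prob_completely_unsafe_nice}, whose hypotheses are satisfied precisely because $\mathcal{E}_S$ forces the vertices in $U$ to be lucky, so that $d_{\mathbf{G}-G(H)}^i(v)=d_{\mathbf{G}-G(H),\overline{b}}^o(v)=0$ for every $v\in U$. Since $E_1^a(H)=\varnothing$ (because $\mathcal{L}_1(H)=\varnothing$) and $|\mathcal{S}(H)|-|U|\leq |S|$, we conclude
\[
\big|\mathbb{E}\big[\overline{\mathbf{Y}}_H\big|\mathbf{x},\mathcal{E}_{H,b},\mathcal{E}_S\big]\big|
\leq n^{\frac{2K}{A}}\left(\tfrac{6}{\epsilon}\right)^{\tau|S|}\left(\tfrac{\epsilon d}{2n}\right)^{|E_1(H)|}\cdot \mathbb{E}\big[|\tilde{\mathbf{Y}}_{\geq 2}^H|\big|\mathbf{x}\big],
\]
and \Cref{lem:lem_UH_deg2} (with $\mathcal{L}_{\geq 2}(H)=\varnothing$, hence $E_{\geq 2}^b(H)=\varnothing$) upper bounds $\mathbb{E}[|\tilde{\mathbf{Y}}_{\geq 2}^H|\,|\mathbf{x}]$ by $(d/n)^{|E_{\geq 2}(H)|}\prod_{uv\in E_{\geq 2}(H)}\big[1+\tfrac{\epsilon \mathbf{x}_u\mathbf{x}_v}{2}+\tfrac{d}{n}\big]$. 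On the probability side, \Cref{lem:lem_bound_prob_completely_unlucky} gives $\mathbb{P}[\mathcal{E}_S|\mathbf{x},\mathcal{E}_{H,b}]\leq n^{-|S|/5}$.

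Combining these bounds and taking $\mathbb{E}_{\mathbf{x}}$, the factor coming from the product in $\mathbb{E}[|\tilde{\mathbf{Y}}_{\geq 2}^H|\,|\mathbf{x}]$ must be kept $O(1)$ after averaging; this is where the forest hypothesis on $E_{\geq 2}(H)$ is essential. Expanding
$\prod_{uv\in E_{\geq 2}(H)}\big[(1+\tfrac{d}{n})+\tfrac{\epsilon}{2}\mathbf{x}_u\mathbf{x}_v\big]$
and applying \Cref{lem:lem_Expectation_Y_Nice}, every nonempty subset of $E_{\geq 2}(H)$ fails to be an edge-disjoint union of cycles (since a forest has none), so each nonconstant monomial has zero expectation and
$\mathbb{E}_{\mathbf{x}}\big[\prod(1+\tfrac{\epsilon \mathbf{x}_u\mathbf{x}_v}{2}+\tfrac{d}{n})\big]\leq (1+\tfrac{d}{n})^{|E_{\geq 2}(H)|}\leq 2$
for $n$ large. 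What remains is the combinatorial/arithmetic sum
\[
\sum_{\substack{S\subseteq V(H)\\ S\neq\varnothing}} \binom{st}{|S|}\left(\tfrac{6}{\epsilon}\right)^{\tau|S|} n^{-|S|/5},
\]
which is geometric in $|S|$ because $(6/\epsilon)^{\tau}$ is a constant (recall $\tau=As\log(6/\epsilon)$) and $st/n^{1/5}\to 0$; it is therefore bounded by $O(st\cdot n^{-1/5})$. Collecting everything, the total is at most $n^{2K/A-1/5+o(1)}\cdot(\tfrac{\epsilon d}{2n})^{|E_1(H)|}(d/n)^{|E_{\geq 2}(H)|}$, and the condition $A>100K$ gives $2K/A<1/50<1/5-1/6$, so the $n^{-1/6}$ bound holds for $n$ large enough.

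The main obstacle I anticipate is a bookkeeping one rather than a conceptual one: carefully verifying that the conditioning on $\mathcal{E}_{H,b}\cap\mathcal{E}_S$ satisfies all four hypotheses of \Cref{lem:lem_UH_deg1_common,lem:lem_UH_deg1_common_2} and \Cref{lem:lem_prob_completely_unsafe_nice} simultaneously, particularly the conditional independence statements (c) and (d), which rely on the fact that the edges appearing in the definition of $\mathcal{E}_S$ are disjoint from $E(H)$ and from the edges $\{uv\in\mathbf{G}:u\in V_b(H,\mathbf{x})\}$ used in the safety analysis. Once this bookkeeping is done, the arithmetic described above closes the proof cleanly.
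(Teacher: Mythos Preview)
Your proposal is correct and follows essentially the same approach as the paper: decompose $\mathcal{E}_{H,b\overline{g}}$ by the exact set of unlucky vertices (the paper indexes by the complementary set $U$ of lucky vertices), apply \Cref{lem:lem_UH_deg1_common}/\Cref{lem:lem_UH_deg1_common_2} with $U\cap\mathcal{S}(H)$ via \Cref{lem:lem_prob_completely_unsafe_nice}, control the probability via \Cref{lem:lem_bound_prob_completely_unlucky}, sum the resulting binomial series $\big[(1+\tfrac{(6/\epsilon)^\tau}{n^{1/5}})^{|V(H)|}-1\big]$, and finish by averaging the $E_{\geq 2}$-factor over $\mathbf{x}$ using the forest hypothesis and \Cref{lem:lem_Expectation_Y_Nice}. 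The bookkeeping concern you flag about conditions (c) and (d) is exactly what the paper glosses over with ``It is now easy to see that the conditions of \Cref{lem:lem_UH_deg1_common} are satisfied''; your justification for it is adequate.
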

\begin{proof}
Recall \Cref{def:def_deg1_classification}, \Cref{def:def_deg2_classification} and \Cref{def:def_E1_E2_annoying}. Since $\mathcal{L}_1(H)=\mathcal{L}_{\geq 2}(H)=\varnothing$, we have $E_1^a(H)=\varnothing$, $E_{\geq2}^b(H)=\varnothing$, and $E_{\geq 2}(H)=E_{\geq2}^a(H)$. Furthermore, $\mathcal{S}_1(H)=V(H)$ and so $\mathcal{S}(H):=\mathcal{S}_1(H)\cap\mathcal{S}_{\geq 2}(H)=\mathcal{S}_{\geq 2}(H)$.

For every $V\subseteq V(H)$, define the events
$$\mathcal{E}_{V,H,c\text{-}l}=\Big\{V\text{ is completely }(\mathbf{G},\mathbf{x},H)\text{-lucky}\Big\},$$
$$\mathcal{E}_{V,H,c\text{-}ul}=\Big\{V\text{ is completely }(\mathbf{G},\mathbf{x},H)\text{-unlucky}\Big\}.$$

If $\mathcal{E}_{H,g}^c$ occurs then there exists at least one vertex $v\in V(H)$ that is unlucky. Hence, $\big\{\mathcal{E}_{U,H,c\text{-}l}\cap \mathcal{E}_{V(H)\setminus U,H,c\text{-}ul}:\; U\subsetneq V(H)\big\}$
forms a partition of the event $\mathcal{E}_{H,g}^c$. Therefore,
\begin{equation}
\label{eq:eq_lem_upper_bound_nice_g_eq_1}
\begin{aligned}
&\left|\mathbb{E}\big[\overline{\mathbf{Y}}_H\big|\mathbf{x},\mathcal{E}_{H,b\overline{g}}\big]\cdot\mathbb{P}[\mathcal{E}_{H,b\overline{g}}|\mathbf{x}]\right|\\
&=\resizebox{0.95\textwidth}{!}{$\displaystyle\left|\sum_{U\subsetneq V(H)}\mathbb{E}\big[\overline{\mathbf{Y}}_H\big|\mathbf{x},\mathcal{E}_{H,b}\cap\mathcal{E}_{U,H,c\text{-}l}\cap \mathcal{E}_{V(H)\setminus U,H,c\text{-}ul}\big]\cdot\mathbb{P}\left[\mathcal{E}_{H,b}\cap\mathcal{E}_{U,H,c\text{-}l}\cap \mathcal{E}_{V(H)\setminus U,H,c\text{-}ul}\middle|\mathbf{x}\right]\right|$}\\
&\leq\resizebox{0.95\textwidth}{!}{$\displaystyle\sum_{U\subsetneq V(H)}\left|\mathbb{E}\big[\overline{\mathbf{Y}}_H\big|\mathbf{x},\mathcal{E}_{H,b}\cap\mathcal{E}_{U,H,c\text{-}l}\cap \mathcal{E}_{V(H)\setminus U,H,c\text{-}ul}\big]\right|\cdot\mathbb{P}\left[\mathcal{E}_{H,b}\cap\mathcal{E}_{U,H,c\text{-}l}\cap \mathcal{E}_{V(H)\setminus U,H,c\text{-}ul}\middle|\mathbf{x}\right]$}\\
&\leq\sum_{U\subsetneq V(H)}\left|\mathbb{E}\big[\overline{\mathbf{Y}}_H\big|\mathbf{x},\mathcal{E}_{H,b}\cap\mathcal{E}_{U,H,c\text{-}l}\cap \mathcal{E}_{V(H)\setminus U,H,c\text{-}ul}\big]\right|\cdot\mathbb{P}\left[\mathcal{E}_{V(H)\setminus U,H,c\text{-}ul}\middle|\mathbf{x},\mathcal{E}_{H,b}\right]\\
&\leq\sum_{U\subsetneq V(H)}\left|\mathbb{E}\big[\overline{\mathbf{Y}}_H\big|\mathbf{x},\mathcal{E}_{H,b}\cap\mathcal{E}_{U,H,c\text{-}l}\cap \mathcal{E}_{V(H)\setminus U,H,c\text{-}ul}\big]\right|\cdot\frac{1}{n^{\frac{1}{5}|V(H)\setminus U|}},
\end{aligned}
\end{equation}
where the last inequality follows from \Cref{lem:lem_bound_prob_completely_unlucky}.

Now fix $U\subsetneq V(H)$. \Cref{lem:lem_prob_completely_unsafe_nice} implies that for every $V\subseteq U\cap\mathcal{S}(H)$, we have
$$\mathbb{P}\big[\big\{V\text{ is completely }H\text{-unsafe in }\mathbf{G}\big\}\big|\mathbf{x},\mathcal{E}_{H,b}\cap\mathcal{E}_{U,H,c\text{-}l}\cap \mathcal{E}_{V(H)\setminus U,H,c\text{-}ul}\big]\leq\left(\frac{\eta}{2}\right)^{|V|}\leq\eta^{|V|}.$$
It is now easy to see that the conditions of \Cref{lem:lem_UH_deg1_common} are satisfied for $U\cap\mathcal{S}(H)\subset \mathcal{S}(H)$ and $\mathcal{E}=\mathcal{E}_{H,b}\cap\mathcal{E}_{U,H,c\text{-}l}\cap \mathcal{E}_{V(H)\setminus U,H,c\text{-}ul}$. Therefore,
\begin{equation}
\label{eq:eq_lem_upper_bound_nice_g_eq_2}
\begin{aligned}
&\left|\mathbb{E}\big[\overline{\mathbf{Y}}_H\big|\mathbf{x},\mathcal{E}_{H,b}\cap\mathcal{E}_{U,H,c\text{-}l}\cap \mathcal{E}_{V(H)\setminus U,H,c\text{-}ul}\big]\right|= \big|\mathbb{E}\big[\overline{\mathbf{Y}}_H\big|\mathbf{x},\mathcal{E}\big]\big|\\
&\quad\quad\leq n^{\frac{K}{A}}\left(\frac{6}{\epsilon}\right)^{|E_1^a(H)|+|E_1^d(H)|+\tau(|\mathcal{S}(H)|-|U\cap \mathcal{S}(H)|)}\cdot\left(\frac{\epsilon d}{2n}\right)^{|E_1(H)|} \cdot\mathbb{E}\big[|\tilde{\mathbf{Y}}_{\geq 2}^H|\big|\mathbf{x},\mathcal{E}\big]\\
&\quad\quad\leq n^{\frac{2K}{A}}\left(\frac{6}{\epsilon}\right)^{\tau(|\mathcal{S}(H)|-|U\cap \mathcal{S}(H)|)}\cdot\left(\frac{\epsilon d}{2n}\right)^{|E_1(H)|} \cdot\mathbb{E}\big[|\tilde{\mathbf{Y}}_{\geq 2}^H|\big|\mathbf{x}\big],
\end{aligned}
\end{equation}
where the last equality follows from \Cref{lem:lem_UH_deg1_common_E_1d}, the fact that $E_1^a(H)=\varnothing$, and the fact that given $\mathbf{x}$, the event $\mathcal{E}$ is conditionally independent from $\tilde{\mathbf{Y}}_{\geq 2}^H$.

Notice that
\begin{equation}
\label{eq:eq_lem_upper_bound_nice_g_eq_3}
|\mathcal{S}(H)|-|U\cap \mathcal{S}(H)| = |\mathcal{S}(H)\setminus U|\leq  |V(H)\setminus U|= |V(H)|-|U|.
\end{equation}

Now by combining \cref{eq:eq_lem_upper_bound_nice_g_eq_1} and \cref{eq:eq_lem_upper_bound_nice_g_eq_2} and \cref{eq:eq_lem_upper_bound_nice_g_eq_3}, we get

\begin{align*}
&\left|\mathbb{E}\big[\overline{\mathbf{Y}}_H\big|\mathbf{x},\mathcal{E}_{H,b\overline{g}}\big]\cdot\mathbb{P}[\mathcal{E}_{H,b\overline{g}}|\mathbf{x}]\right|\\
&\quad\quad\quad\quad\leq\sum_{U\subsetneq V(H)} n^{\frac{2K}{A}}\left(\frac{6}{\epsilon}\right)^{\tau(|V(H)|-|U|)}\cdot\left(\frac{\epsilon d}{2n}\right)^{|E_1(H)|} \cdot\mathbb{E}\big[|\tilde{\mathbf{Y}}_{\geq 2}^H|\big|\mathbf{x}\big]\cdot\frac{1}{n^{\frac{1}{5}(|V(H)|-|U|)}}\\
&\quad\quad\quad\quad= n^{\frac{2K}{A}}\cdot\left(\frac{\epsilon d}{2n}\right)^{|E_1(H)|} \cdot\mathbb{E}\big[|\tilde{\mathbf{Y}}_{\geq 2}^H|\big|\mathbf{x}\big]\sum_{U\subsetneq V(H)}\left(\frac{6^\tau}{\epsilon^\tau \cdot n^{\frac{1}{5}}}\right)^{|V(H)|-|U|}\\
&\quad\quad\quad\quad= n^{\frac{2K}{A}}\cdot\left(\frac{\epsilon d}{2n}\right)^{|E_1(H)|} \cdot\mathbb{E}\big[|\tilde{\mathbf{Y}}_{\geq 2}^H|\big|\mathbf{x}\big]\cdot\left[\left(1+\frac{6^\tau}{\epsilon^\tau \cdot n^{\frac{1}{5}}}\right)^{|V(H)|}-1\right].
\end{align*}

Now since $|V(H)|\leq st=sK\log n$, we have
\begin{align*}
\left(1+\frac{6^\tau}{\epsilon^\tau \cdot n^{\frac{1}{5}}}\right)^{|V(H)|}&\leq \left(1+\frac{6^\tau}{\epsilon^\tau \cdot n^{\frac{1}{5}}}\right)^{st}\leq e^{\frac{6^\tau}{\epsilon^\tau \cdot n^{\frac{1}{5}}}st}=1+O\left(\frac{6^\tau \cdot st}{\epsilon^\tau \cdot n^{\frac{1}{5}}}\right).
\end{align*}

Thus,
\begin{align*}
&\left|\mathbb{E}\big[\overline{\mathbf{Y}}_H\big|\mathbf{x},\mathcal{E}_{H,b\overline{g}}\big]\cdot\mathbb{P}[\mathcal{E}_{H,b\overline{g}}|\mathbf{x}]\right|\leq O\left(\frac{6^\tau \cdot st\cdot n^{\frac{2K}{A}}}{\epsilon^\tau \cdot n^{\frac{1}{5}}}\right)\cdot \left(\frac{\epsilon d}{2n}\right)^{|E_1(H)|}\cdot\mathbb{E}\big[|\tilde{\mathbf{Y}}_{\geq 2}^H|\big|\mathbf{x}\big].
\end{align*}

By using \Cref{lem:lem_UH_deg2} and the fact that $E_{\geq2}^b(H)=\varnothing$, $E_{\geq2}^a(H)=E_{\geq2}(H)$, and $\mathcal{L}_{\geq2}(H)=\varnothing$, we get

$$\mathbb{E}\big[|\tilde{\mathbf{Y}}_{\geq 2}^H|\big|\mathbf{x}\big]\leq \left(\frac{d}{n}\right)^{|E_{\geq2}(H)|}\prod_{uv\in E_{\geq2}(H)}\left[1+\frac{\epsilon \mathbf{x}_u\mathbf{x}_v}{2}+\frac{d}{n}\right].$$

Therefore,
\begin{align*}
&\left|\mathbb{E}\big[\overline{\mathbf{Y}}_H\big|\mathbf{x},\mathcal{E}_{H,b\overline{g}}\big]\cdot\mathbb{P}[\mathcal{E}_{H,b\overline{g}}|\mathbf{x}]\right|\\
&\quad\quad\quad\quad\quad\quad\quad\leq O\left(\frac{6^\tau \cdot st\cdot n^{\frac{2K}{A}}}{\epsilon^\tau \cdot n^{\frac{1}{5}}}\right)\cdot \left(\frac{\epsilon d}{2n}\right)^{|E_1(H)|}\cdot\left(\frac{d}{n}\right)^{|E_{\geq2}(H)|}\prod_{uv\in E_{\geq2}(H)}\left[1+\frac{\epsilon \mathbf{x}_u\mathbf{x}_v}{2}+\frac{d}{n}\right].
\end{align*}

If $A>\max\{100K,1\}$, and $n$ is large enough, we get
\begin{align*}
\left|\mathbb{E}\big[\overline{\mathbf{Y}}_H\big|\mathbf{x},\mathcal{E}_{H,b\overline{g}}\big]\cdot\mathbb{P}[\mathcal{E}_{H,b\overline{g}}|\mathbf{x}]\right|\leq \frac{1}{2n^{\frac{1}{6}}}\cdot \left(\frac{\epsilon d}{2n}\right)^{|E_1(H)|}\left(\frac{d}{n}\right)^{|E_{\geq2}(H)|}\prod_{uv\in E_{\geq2}(H)}\left[1+\frac{\epsilon \mathbf{x}_u\mathbf{x}_v}{2}+\frac{d}{n}\right].
\end{align*}

We conclude that
\begin{align*}
\left|\mathbb{E}\big[\overline{\mathbf{Y}}_H\big|\mathcal{E}_{H,b\overline{g}}\big]\cdot\mathbb{P}[\mathcal{E}_{H,b\overline{g}}]\right|&\leq \frac{1}{2n^{\frac{1}{6}}}\cdot \left(\frac{\epsilon d}{2n}\right)^{|E_1(H)|}\left(\frac{d}{n}\right)^{|E_{\geq2}(H)|}\cdot\mathbb{E}\left[\prod_{uv\in E_{\geq2}(H)}\left[1+\frac{\epsilon \mathbf{x}_u\mathbf{x}_v}{2}+\frac{d}{n}\right]\right]\\
&\stackrel{(\ddagger)}{=} \frac{1}{2n^{\frac{1}{6}}}\cdot \left(\frac{\epsilon d}{2n}\right)^{|E_1(H)|}\left(\frac{d}{n}\right)^{|E_{\geq2}(H)|}\cdot \left[\prod_{uv\in E_{\geq 2}(H)}\left(1+\frac{d}{n}\right)\right]\\
&= \frac{1}{2n^{\frac{1}{6}}} \cdot\left(\frac{\epsilon d}{2n}\right)^{|E_1(H)|}\cdot\left(\frac{d}{n}\right)^{|E_{\geq 2}(H)|}\cdot\left(1+O\left(\frac{dst}{n}\right)\right)\\
&\stackrel{(\wr)}{\leq} \frac{1}{n^{\frac{1}{6}}} \cdot \left(\frac{\epsilon d}{2n}\right)^{|E_1(H)|}\cdot\left(\frac{d}{n}\right)^{|E_{\geq 2}(H)|},
\end{align*}
where $(\ddagger)$ follows from \Cref{lem:lem_Expectation_Y_Nice} and the fact that $E_{\geq 2}(H)$ forms a forest, and $(\wr)$ is true for $n$ large enough.
\end{proof}

Now we will prove an upper bound on $\left|\mathbb{E}\big[\overline{\mathbf{Y}}_H\big|\mathcal{E}_{H,bg\overline{d}}\big]\cdot\mathbb{P}[\mathcal{E}_{H,bg\overline{d}}]\right|$, where $\mathcal{E}_{H,bg\overline{d}}$ is as in \cref{eq:eq_def_E_wb_H_xgod}.

\begin{lemma}
\label{lem:lem_upper_bound_nice_d}
Let $H$ be a multigraph with at most $st=sK\log n$ vertices and at most $st$ multi-edges. Assume that $\mathcal{L}_1(H)=\mathcal{L}_{\geq 2}(H)=\varnothing$, and that $E_{\geq 2}(H)$ forms a forest. If $A>\max\{100K,1\}$ and $n$ is large enough, then we have
$$\left|\mathbb{E}\big[\overline{\mathbf{Y}}_H\big|\mathcal{E}_{H,bg\overline{d}}\big]\cdot\mathbb{P}[\mathcal{E}_{H,bg\overline{d}}]\right|\leq \frac{1}{n^{\frac{1}{2}}}\cdot \left(\frac{\epsilon d}{2n}\right)^{|E_1(H)|}\cdot\left(\frac{d}{n}\right)^{|E_{\geq 2}(H)|},$$
where $\mathcal{E}_{H,bg\overline{d}}$ is an in \cref{eq:eq_def_E_wb_H_xgod}.
\end{lemma}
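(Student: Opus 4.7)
The strategy closely mirrors that of \Cref{lem:lem_upper_bound_nice_g}, but exploits the additional gain provided by the existence of at least one absent edge of multiplicity at least 2. Recall that if an edge $uv\in E_{\geq 2}(H)$ is \emph{present} in $\mathbf{G}$ its contribution to the expectation of $\mathbf{Y}_{uv}^{m_H(uv)}$ is of order $d/n$, while if it is \emph{absent} its contribution is only $(d/n)^{m_H(uv)}\leq (d/n)^2$. Thus every absent multiplicity-$\geq 2$ edge gives up an additional factor of order $d/n$ compared to the baseline upper bound on $|\mathbb{E}[\overline{\mathbf{Y}}_H|\mathbf{x}]|$. Since at least one edge in $E_{\geq 2}(H)$ must be absent for $\mathcal{E}_{H,bg\overline{d}}$ to occur, we expect to save a factor of order $1/n$.

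First I would partition $\mathcal{E}_{H,bg\overline{d}}$ according to the exact set of absent multiplicity-$\geq 2$ edges: for every nonempty $T\subseteq E_{\geq 2}(H)$, let $\mathcal{E}_T$ be the event that the edges in $T$ are absent from $\mathbf{G}$ and the edges in $E_{\geq 2}(H)\setminus T$ are present. Then
\begin{equation*}
\bigl|\mathbb{E}\bigl[\overline{\mathbf{Y}}_H\bigm|\mathcal{E}_{H,bg\overline{d}}\bigr]\cdot\mathbb{P}[\mathcal{E}_{H,bg\overline{d}}]\bigr|\;\leq\;\sum_{\varnothing\neq T\subseteq E_{\geq 2}(H)}\bigl|\mathbb{E}\bigl[\overline{\mathbf{Y}}_H\cdot \mathbbm{1}_{\mathcal{E}_{H,b}\cap \mathcal{E}_{H,g}\cap\mathcal{E}_T}\bigr]\bigr|.
\end{equation*}
Fix one such $T$. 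Conditionally on $\mathbf{x}$, the events $\mathcal{E}_{H,b}$, $\mathcal{E}_{H,g}$ and $\mathcal{E}_T$ together determine the presence of all $H$-cross-edges and all multiplicity-$\geq 2$ edges, and the multiplicity-1 edges in $H$ remain conditionally independent from $\mathcal{E}_{H,b}\cap \mathcal{E}_{H,g}\cap \mathcal{E}_T$ given $\mathbf{x}$. This structural property is exactly what is needed to apply \Cref{lem:lem_UH_deg1_common} (as we did in the proof of \Cref{lem:lem_upper_bound_nice_g}) to the event $\mathcal{E}=\mathcal{E}_{H,b}\cap \mathcal{E}_{H,g}\cap \mathcal{E}_T$, using the balanced/lucky decomposition together with \Cref{lem:lem_prob_completely_unsafe_nice}.

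The new ingredient compared to \Cref{lem:lem_upper_bound_nice_g} is that we now track explicitly the contribution of the edges in $T$. Each edge $uv\in T$ contributes $(-d/n)^{m_H(uv)}\bigl(1-(1+\tfrac{\epsilon \mathbf{x}_u\mathbf{x}_v}{2})\tfrac{d}{n}\bigr)$, which is bounded in absolute value by $(d/n)^{m_H(uv)}\leq (d/n)^2$; combining this with the usual $(d/n)$ bound for every edge in $E_{\geq 2}(H)\setminus T$ and the $(\epsilon d/2n)^{|E_1(H)|}$ bound for edges of multiplicity 1, and using $\mathcal{L}_1(H)=\mathcal{L}_{\geq 2}(H)=\varnothing$, $E_{\geq 2}(H)=E_{\geq 2}^a(H)$, and the fact that $E_{\geq 2}(H)$ is a forest (so \Cref{lem:lem_Expectation_Y_Nice} lets us take expectations over $\mathbf{x}$), we obtain
\begin{equation*}
\bigl|\mathbb{E}\bigl[\overline{\mathbf{Y}}_H\cdot \mathbbm{1}_{\mathcal{E}_{H,b}\cap \mathcal{E}_{H,g}\cap\mathcal{E}_T}\bigr]\bigr|\;\leq\; n^{\frac{2K}{A}}\cdot \Paren{\frac{d}{n}}^{|T|}\cdot \Paren{\frac{\epsilon d}{2n}}^{|E_1(H)|}\Paren{\frac{d}{n}}^{|E_{\geq 2}(H)|}\cdot(1+o(1)),
\end{equation*}
where the $n^{2K/A}$ factor and $o(1)$ absorb the luckiness/safety losses, exactly as in the proof of \Cref{lem:lem_upper_bound_nice_g}, and the new factor $(d/n)^{|T|}$ records the gain from the absent edges.

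Finally, summing over nonempty $T\subseteq E_{\geq 2}(H)$ produces at most $\sum_{k\geq 1}\binom{|E_{\geq 2}(H)|}{k}(d/n)^k\leq (1+d/n)^{st}-1=O(dst/n)$, so
\begin{equation*}
\bigl|\mathbb{E}\bigl[\overline{\mathbf{Y}}_H\bigm|\mathcal{E}_{H,bg\overline{d}}\bigr]\cdot\mathbb{P}[\mathcal{E}_{H,bg\overline{d}}]\bigr|\;\leq\;O\Paren{\frac{dst\cdot n^{\frac{2K}{A}}}{n}}\cdot \Paren{\frac{\epsilon d}{2n}}^{|E_1(H)|}\Paren{\frac{d}{n}}^{|E_{\geq 2}(H)|}.
\end{equation*}
Since $A>100K$ gives $n^{2K/A}\leq n^{1/50}$ and $st=sK\log n$ is polylogarithmic, for $n$ large enough this is bounded by $n^{-1/2}\cdot (\epsilon d/2n)^{|E_1(H)|}(d/n)^{|E_{\geq 2}(H)|}$, as claimed. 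The main obstacle is the bookkeeping in the first inequality: one must verify that conditioning on $\mathcal{E}_T$ preserves the independence structure required by \Cref{lem:lem_UH_deg1_common} and that the luckiness/unluckiness decomposition (as in \Cref{lem:lem_upper_bound_nice_g}) can be carried out uniformly in $T$, so that the $n^{2K/A}$ losses do not accumulate when summing over $T$.
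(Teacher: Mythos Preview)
Your proposal is correct and follows essentially the same route as the paper. Both arguments partition according to the set of absent multiplicity-$\geq 2$ edges, apply \Cref{lem:lem_UH_deg1_common} to control the multiplicity-1 contribution, extract the extra $(d/n)^{|T|}$ gain from the absent edges, sum the resulting geometric series to get $O(dst/n)$, and use the forest structure of $E_{\geq 2}(H)$ together with \Cref{lem:lem_Expectation_Y_Nice} when averaging over $\mathbf{x}$.

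Two minor remarks on organization. First, the paper applies \Cref{lem:lem_UH_deg1_common} \emph{once} to the entire event $\mathcal{E}_{H,bg\overline{d}}$ with $U=\mathcal{S}(H)$, and only afterwards decomposes $\mathbb{E}\big[|\mathbf{Y}_{\geq 2}^H|\big|\mathbf{x},\mathcal{E}_{H,bg\overline{d}}\big]\cdot\mathbb{P}[\mathcal{E}_{H,bg\overline{d}}|\mathbf{x}]$ over the present/absent pattern; this immediately dispels your stated concern about $n^{2K/A}$ factors accumulating across $T$, since the lemma is invoked exactly once. Second, your reference to the ``balanced/lucky decomposition'' from the proof of \Cref{lem:lem_upper_bound_nice_g} is unnecessary here: the event $\mathcal{E}_{H,bg\overline{d}}$ (and each of your $\mathcal{E}_{H,b}\cap\mathcal{E}_{H,g}\cap\mathcal{E}_T$) already contains $\mathcal{E}_{H,g}$, which forces \emph{every} vertex of $V(H)$ to be lucky, so you may take $U=\mathcal{S}(H)$ directly and invoke \Cref{lem:lem_prob_completely_unsafe_nice} without any further case analysis. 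This is exactly what the paper does, and it streamlines the bookkeeping.
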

\begin{proof}
Recall \Cref{def:def_deg1_classification}, \Cref{def:def_deg2_classification} and \Cref{def:def_E1_E2_annoying}. It is easy to see that if we take $U=\mathcal{S}(H):=\mathcal{S}_1(H)\cap\mathcal{S}_{\geq 2}(H)=V(H)$ and $\mathcal{E}=\mathcal{E}_{H,bg\overline{d}}$, then the conditions of \Cref{lem:lem_UH_deg1_common} are satisfied. Therefore,
\begin{align*}
&\big|\mathbb{E}\big[\overline{\mathbf{Y}}_H\big|\mathbf{x},\mathcal{E}_{H,bg\overline{d}}\big]\big|\\
&\quad\quad\quad\quad\quad\leq n^{\frac{K}{A}}\left(\frac{6}{\epsilon}\right)^{|E_1^a(H)|+|E_1^d(H)|+\tau(|\mathcal{S}(H)|-|U|)}\cdot\left(\frac{\epsilon d}{2n}\right)^{|E_1(H)|} \cdot\mathbb{E}\big[|\tilde{\mathbf{Y}}_{\geq 2}^H|\big|\mathbf{x},\mathcal{E}_{H,bg\overline{d}}\big]\\
&\quad\quad\quad\quad\quad\leq n^{\frac{2K}{A}}\left(\frac{\epsilon d}{2n}\right)^{|E_1(H)|} \cdot\mathbb{E}\big[|\tilde{\mathbf{Y}}_{\geq 2}^H|\big|\mathbf{x},\mathcal{E}_{H,bg\overline{d}}\big],
\end{align*}
where the last inequality follows from \Cref{lem:lem_UH_deg1_common_E_1d} and the fact that $E_1^a(H)=\varnothing$ and $U=\mathcal{S}(H)$.

Therefore,
\begin{equation}
\label{eq:eq_lem_upper_bound_nice_d_eq_1}
\begin{aligned}
&\left|\mathbb{E}\big[\overline{\mathbf{Y}}_H\big|\mathbf{x},\mathcal{E}_{H,bg\overline{d}}\big]\cdot\mathbb{P}[\mathcal{E}_{H,bg\overline{d}}|\mathbf{x}]\right|\\
&\quad\quad\quad\quad\quad\quad\leq  n^{\frac{2K}{A}}\left(\frac{\epsilon d}{2n}\right)^{|E_1(H)|} \cdot\mathbb{E}\big[|\tilde{\mathbf{Y}}_{\geq 2}^H|\big|\mathbf{x},\mathcal{E}_{H,bg\overline{d}}\big]\cdot\mathbb{P}[\mathcal{E}_{H,bg\overline{d}}|\mathbf{x}]\\
&\quad\quad\quad\quad\quad\quad=  n^{\frac{2K}{A}}\left(\frac{\epsilon d}{2n}\right)^{|E_1(H)|} \cdot\mathbb{E}\big[|\mathbf{Y}_{\geq 2}^H|\big|\mathbf{x},\mathcal{E}_{H,bg\overline{d}}\big]\cdot\mathbb{P}[\mathcal{E}_{H,bg\overline{d}}|\mathbf{x}],
\end{aligned}
\end{equation}
where the last equality follows\footnote{We could have used $|\tilde{\mathbf{Y}}_{\geq 2}^H|\leq |\mathbf{Y}_{\geq 2}^H|$, which is true in general.} from the fact that $\mathcal{L}_{\geq 2}(H)=\varnothing$, which means that $\tilde{\mathbf{Y}}_{\geq 2}^H=\mathbf{Y}_{\geq 2}^H$, where
$$\mathbf{Y}_{\geq 2}^H=\prod_{uv\in E_{\geq 2}(H)}\mathbf{Y}_{uv}^{m_H(uv)}.$$

Now for every $E\subseteq E_{\geq 2}(H)$, define the events:
$$\mathcal{E}_{E,c\text{-}d}=\big\{e\in\mathbf{G},\;\forall e\in E\big\}\quad\text{and}\quad\mathcal{E}_{E,c\text{-}\overline{d}}=\big\{e\notin\mathbf{G},\;\forall e\in E\big\}.$$

Let $\mathcal{E}_{H,bg}$ be as in \cref{eq:eq_def_E_wb_H_xg}. Since $\big\{\mathcal{E}_{E,c\text{-}d}\cap \mathcal{E}_{E_{\geq 2}(H)\setminus E,c\text{-}\overline{d}}:\; E\subsetneq E_{\geq 2}(H)\big\}$ is a partition of $\mathcal{E}_{H,d}^c$ and since $\mathcal{E}_{H,bg\overline{d}}=\mathcal{E}_{H,bg}\cap \mathcal{E}_{H,d}^c$, we can write:

\begin{align*}
&\mathbb{E}\big[|\mathbf{Y}_{\geq 2}^H|\big|\mathbf{x},\mathcal{E}_{H,bg\overline{d}}\big]\cdot\mathbb{P}[\mathcal{E}_{H,bg\overline{d}}|\mathbf{x}]\\
&=\mathbb{E}\big[|\mathbf{Y}_{\geq 2}^H|\big|\mathbf{x},\mathcal{E}_{H,bg}\cap \mathcal{E}_{H,d}^c\big]\cdot\mathbb{P}[\mathcal{E}_{H,bg}\cap \mathcal{E}_{H,d}^c|\mathbf{x}]\\
&=\resizebox{0.95\textwidth}{!}{$\displaystyle\sum_{E\subsetneq E_{\geq 2}(H)} \mathbb{E}\big[|\mathbf{Y}_{\geq 2}^H|\big|\mathbf{x},\mathcal{E}_{H,bg}\cap \mathcal{E}_{E,c\text{-}d}\cap \mathcal{E}_{E_{\geq 2}(H)\setminus E,c\text{-}\overline{d}}\big]\cdot\mathbb{P}\big[\mathcal{E}_{H,bg}\cap \mathcal{E}_{E,c\text{-}d}\cap \mathcal{E}_{E_{\geq 2}(H)\setminus E,c\text{-}\overline{d}}\big|\mathbf{x}\big]$}\\
&\leq\sum_{E\subsetneq E_{\geq 2}(H)} \mathbb{E}\big[|\mathbf{Y}_{\geq 2}^H|\big|\mathbf{x},\mathcal{E}_{H,bg}\cap \mathcal{E}_{E,c\text{-}d}\cap \mathcal{E}_{E_{\geq 2}(H)\setminus E,c\text{-}\overline{d}}\big]\cdot\mathbb{P}[\mathcal{E}_{E,c\text{-}d}\cap \mathcal{E}_{E_{\geq 2}(H)\setminus E,c\text{-}\overline{d}}|\mathbf{x}]\\
&=\resizebox{0.95\textwidth}{!}{$\displaystyle\sum_{E\subsetneq E_{\geq 2}(H)}\prod_{uv\in E}\left[ \left(1-\frac{d}{n}\right)^{m_H(uv)}\cdot\left(1+\frac{\epsilon \mathbf{x}_u\mathbf{x}_v}{2}\right)\frac{d}{n}\right] \cdot\prod_{uv\in E_{\geq 2}(H)\setminus E}\left[ \left(1-\left(1+\frac{\epsilon \mathbf{x}_u\mathbf{x}_v}{2}\right)\frac{d}{n}\right)\cdot \left(\frac{d}{n}\right)^{m_H(uv)}\right]$} \\
&\leq\sum_{E\subsetneq E_{\geq 2}(H)}\prod_{uv\in E}\left[\left(1+\frac{\epsilon \mathbf{x}_u\mathbf{x}_v}{2}\right)\frac{d}{n}\right]\cdot\prod_{uv\in E_{\geq 2}(H)\setminus E} \left(\frac{d}{n}\right)^2 \\
&=\prod_{uv\in E_{\geq 2}(H)}\left[\left(1+\frac{\epsilon \mathbf{x}_u\mathbf{x}_v}{2}\right)\frac{d}{n}+\frac{d}{n^2} \right] - \prod_{uv\in E_{\geq 2}(H)}\left[\left(1+\frac{\epsilon \mathbf{x}_u\mathbf{x}_v}{2}\right)\frac{d}{n}\right].
\end{align*}

By combining this with \cref{eq:eq_lem_upper_bound_nice_d_eq_1}, we get

\begin{align*}
&\left|\mathbb{E}\big[\overline{\mathbf{Y}}_H\big|\mathbf{x},\mathcal{E}_{H,bg\overline{d}}\big]\cdot\mathbb{P}[\mathcal{E}_{H,bg\overline{d}}|\mathbf{x}]\right|\\
&\quad\quad\quad\quad\leq n^{\frac{2K}{A}}\left(\frac{\epsilon d}{2n}\right)^{|E_1(H)|} \cdot\left[\prod_{uv\in E_{\geq 2}(H)}\left[\left(1+\frac{\epsilon \mathbf{x}_u\mathbf{x}_v}{2}\right)\frac{d}{n}+\frac{d}{n^2} \right] - \prod_{uv\in E_{\geq 2}(H)}\left[\left(1+\frac{\epsilon \mathbf{x}_u\mathbf{x}_v}{2}\right)\frac{d}{n}\right]\right].
\end{align*}

Therefore,
\begin{align*}
&\left|\mathbb{E}\big[\overline{\mathbf{Y}}_H\big|\mathcal{E}_{H,bg\overline{d}}\big]\cdot\mathbb{P}[\mathcal{E}_{H,bg\overline{d}}]\right|\\
&\quad\quad\quad\quad\leq n^{\frac{2K}{A}}\left(\frac{\epsilon d}{2n}\right)^{|E_1(H)|} \cdot \mathbb{E}\left[\prod_{uv\in E_{\geq 2}(H)}\left[\left(1+\frac{\epsilon \mathbf{x}_u\mathbf{x}_v}{2}\right)\frac{d}{n}+\frac{d}{n^2} \right] - \prod_{uv\in E_{\geq 2}(H)}\left[\left(1+\frac{\epsilon \mathbf{x}_u\mathbf{x}_v}{2}\right)\frac{d}{n}\right]\right]\\
&\quad\quad\quad\quad\stackrel{(\ast)}{=} n^{\frac{2K}{A}}\left(\frac{\epsilon d}{2n}\right)^{|E_1(H)|} \cdot \left[\prod_{uv\in E_{\geq 2}(H)}\left(\frac{d}{n}+\frac{d}{n^2} \right) - \prod_{uv\in E_{\geq 2}(H)}\left(\frac{d}{n}\right)\right]\\
&\quad\quad\quad\quad\leq n^{\frac{2K}{A}}\left(\frac{\epsilon d}{2n}\right)^{|E_1(H)|} \cdot \left[\left(\frac{d}{n}\right)^{|E_{\geq 2}(H)|}\left(1+O\left(\frac{dst}{n} \right)\right) - \left(\frac{d}{n}\right)^{|E_{\geq 2}(H)|}\right]\\
&\quad\quad\quad\quad = O\left(\frac{dst\cdot n^{\frac{2K}{A}}}{n} \right)\left(\frac{\epsilon d}{2n}\right)^{|E_1(H)|}  \left(\frac{d}{n}\right)^{|E_{\geq 2}(H)|}\stackrel{(\dagger)}{\leq} \frac{1}{\sqrt{n}}\left(\frac{\epsilon d}{2n}\right)^{|E_1(H)|}  \left(\frac{d}{n}\right)^{|E_{\geq 2}(H)|},
\end{align*}
where $(\ast)$ follows from \Cref{lem:lem_Expectation_Y_Nice} and the fact thatr $E_{\geq 2}(H)$ is a forest, and $(\dagger)$ is true when $A>\max\{100K,1\}$ and $n$ is large enough.
\end{proof}

Now we are ready to prove \Cref{lem:lem_upper_bound_nice_Wc}

\begin{proof}[Proof of \Cref{lem:lem_upper_bound_nice_Wc}]
Since $\big\{\mathcal{E}_{H,b}^c,\mathcal{E}_{H,b\overline{g}},\mathcal{E}_{H,bg\overline{d}}\big\}$ is a partition of $\mathcal{E}_{wb,H}^c$, we get:
\begin{align*}
\mathbb{E}\big[\overline{\mathbf{Y}}_H\big|\mathcal{E}_{wb,H}^c\big]\cdot\mathbb{P}[\mathcal{E}_{wb,H}^c]=\mathbb{E}\big[\overline{\mathbf{Y}}_H\big|\mathcal{E}_{H,b}^c\big]\cdot\mathbb{P}[\mathcal{E}_{H,b}^c]
&+ \mathbb{E}\big[\overline{\mathbf{Y}}_H\big|\mathcal{E}_{H,b\overline{g}}\big]\cdot\mathbb{P}[\mathcal{E}_{H,b\overline{g}}]\\
&+ \mathbb{E}\big[\overline{\mathbf{Y}}_H\big|\mathcal{E}_{H,bg\overline{d}}\big]\cdot\mathbb{P}[\mathcal{E}_{H,bg\overline{d}}].
\end{align*}

It follows from \Cref{lem:lem_upper_bound_nice_y}, \Cref{lem:lem_upper_bound_nice_g} and \Cref{lem:lem_upper_bound_nice_d} that:
\begin{align*}
\left|\mathbb{E}\big[\overline{\mathbf{Y}}_H\big|\mathcal{E}_{wb,H}^c\big]\cdot\mathbb{P}[\mathcal{E}_{wb,H}^c]\right|&\leq\left|\mathbb{E}\big[\overline{\mathbf{Y}}_H\big|\mathcal{E}_{H,b}^c\big]\cdot\mathbb{P}[\mathcal{E}_{H,b}^c] \right|+ \left|\mathbb{E}\big[\overline{\mathbf{Y}}_H\big|\mathcal{E}_{H,b\overline{g}}\big]\cdot\mathbb{P}[\mathcal{E}_{H,b\overline{g}}]\right|\\
&\quad\quad\quad\quad\quad\quad\quad\quad\quad\quad\quad\quad\quad+\left| \mathbb{E}\big[\overline{\mathbf{Y}}_H\big|\mathcal{E}_{H,bg\overline{d}}\big]\cdot\mathbb{P}[\mathcal{E}_{H,bg\overline{d}}]\right|\\
&\leq \left(e^{-\sqrt{n}}+\frac{1}{n^{\frac{1}{6}}}+\frac{1}{\sqrt{n}}\right)\cdot \left(\frac{\epsilon d}{2n}\right)^{|E_1(H)|}\cdot\left(\frac{d}{n}\right)^{|E_{\geq 2}(H)|}\\
&\leq \frac{2}{n^{\frac{1}{6}}}\cdot\left(\frac{\epsilon d}{2n}\right)^{|E_1(H)|}\cdot\left(\frac{d}{n}\right)^{|E_{\geq 2}(H)|},
\end{align*}
where the last inequality is true for $n$ large enough.
\end{proof}

\subsubsection{Upper bound on the negligible part of the contribution of the well-behaved event}
\label{subsubsec:subsubsec_upper_bounding_well_behaved_negligible}

In order to prove \Cref{lem:lem_upper_bound_nice_W_2}, we need the following lemma:

\begin{lemma}
\label{lem:lem_Prob_Well_Behaved_Y_2}
If $P_{2,H}^{wb}(\mathbf{x})$ is as in \cref{eq:eq_Prob_Well_Behaved_Y_2}, then for $n$ large enough, we have
$$|P_{2,H}^{wb}(\mathbf{x})|\leq\frac{1}{n^{\frac{1}{5}}}\cdot\prod_{uv\in E_{\geq 2}(H)}\left[\left(1+\frac{\epsilon \mathbf{x}_u\mathbf{x}_v}{2}\right)\frac{d}{n}\right].$$
\end{lemma}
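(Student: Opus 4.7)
The plan is to factor out the combinatorial contribution of $E_{\overline{b}c}(H,\mathbf{x})$ using the algebraic identity
\[
\sum_{\substack{S\subseteq E_{\overline{b}c}(H,\mathbf{x}):\\ S\neq\varnothing}} \prod_{uv\in S}\Bigl[-\Bigl(1+\tfrac{\epsilon \mathbf{x}_u\mathbf{x}_v}{2}\Bigr)\tfrac{d}{n}\Bigr]
= \prod_{uv\in E_{\overline{b}c}(H,\mathbf{x})}\Bigl[1-\Bigl(1+\tfrac{\epsilon \mathbf{x}_u\mathbf{x}_v}{2}\Bigr)\tfrac{d}{n}\Bigr] - 1.
\]
Each factor in the product on the right lies in $[1-2d/n,\,1]$, since $|1+\epsilon \mathbf{x}_u\mathbf{x}_v/2|\leq 2$. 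Hence the product itself lies in $[(1-2d/n)^{N},\,1]$ where $N:=|E_{\overline{b}c}(H,\mathbf{x})|$, so the absolute value of the sum is bounded by $1-(1-2d/n)^{N}\leq 2dN/n$.

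The second step is to control $N$ when $\mathbf{x}$ is approximately balanced on $[n]\setminus V(H)$, since otherwise $\mathbbm{1}_{\mathcal{E}_{H,b}}(\mathbf{x})=0$ and the bound is trivial. By definition (see \cref{eq:eq_def_E_obc_H_H}), $E_{\overline{b}c}(H,\mathbf{x})$ consists of the $H$-cross-edges, of which there are at most $|V(H)|^2\leq s^2t^2$, together with the edges between $V(H)$ and $([n]\setminus V(H))\setminus V_b(H,\mathbf{x})$. When $\mathbf{x}$ is approximately balanced, the latter set has cardinality at most $|V(H)|\cdot 2 n^{3/4}\leq 2st\cdot n^{3/4}$. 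Consequently
\[
N \leq s^2 t^2 + 2st\cdot n^{3/4} = \tilde O\bigl(n^{3/4}\bigr),
\]
and $2dN/n = \tilde O(n^{-1/4})\leq \tfrac{1}{n^{1/5}}$ for $n$ large enough (using $t=K\log n$).

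Combining the two displays,
\[
\left|\sum_{\substack{S\subseteq E_{\overline{b}c}(H,\mathbf{x}):\\ S\neq\varnothing}} \prod_{uv\in S}\Bigl[-\Bigl(1+\tfrac{\epsilon \mathbf{x}_u\mathbf{x}_v}{2}\Bigr)\tfrac{d}{n}\Bigr]\right|\cdot \mathbbm{1}_{\mathcal{E}_{H,b}}(\mathbf{x}) \;\leq\; \frac{1}{n^{1/5}},
\]
and multiplying this by the remaining factor $\displaystyle\prod_{uv\in E_{\geq 2}(H)}\bigl[\bigl(1+\tfrac{\epsilon \mathbf{x}_u\mathbf{x}_v}{2}\bigr)\tfrac{d}{n}\bigr]$, which is nonnegative, yields the claimed bound on $|P_{2,H}^{wb}(\mathbf{x})|$. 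The only mild subtlety in this proof is simply tracking the sparse set $E_{\overline{b}c}(H,\mathbf{x})$ carefully enough so that its size depends on $V_b(H,\mathbf{x})$ only through the approximate-balance event; everything else is elementary.
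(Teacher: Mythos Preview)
Your proof is correct and follows essentially the same route as the paper's. The only cosmetic difference is that you use the exact identity $\sum_{S\neq\varnothing}\prod_{uv\in S}(-a_{uv})=\prod_{uv}(1-a_{uv})-1$ and bound it by $1-(1-2d/n)^N\le 2dN/n$, whereas the paper applies the triangle inequality termwise to get $\sum_{S\neq\varnothing}(2d/n)^{|S|}=(1+2d/n)^N-1=O(dN/n)$; both lead to the same $\tilde O(n^{-1/4})\le n^{-1/5}$ estimate after plugging in $N\le s^2t^2+2st\cdot n^{3/4}$.
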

\begin{proof}
If $\mathbf{x}$ is not approximately balanced on $[n]\setminus V(H)$, then 
\begin{align*}
|P_{2,H}^{wb}(\mathbf{x})|&=0\leq\frac{1}{n^{\frac{1}{5}}}\cdot\prod_{uv\in E_{\geq 2}(H)}\left[\left(1+\frac{\epsilon \mathbf{x}_u\mathbf{x}_v}{2}\right)\frac{d}{n}\right].
\end{align*}

Now assume that $\mathbf{x}$ is approximately balanced on $[n]\setminus V(H)$, and let $E_{\overline{b}c}(H,\mathbf{x})$ be as in \cref{eq:eq_def_E_obc_H_H}. We have:
\begin{align*}
|P_{2,H}^{wb}(\mathbf{x})|&=\left|\sum_{\substack{S\subseteq E_{\overline{b}c}(H,\mathbf{x}):\\S\neq\varnothing}}\prod_{uv\in S}\left[-\left(1+\frac{\epsilon \mathbf{x}_u\mathbf{x}_v}{2}\right)\frac{d}{n}\right]\right|\cdot\left[\prod_{uv\in E_{\geq 2}(H)}\left[\left(1+\frac{\epsilon \mathbf{x}_u\mathbf{x}_v}{2}\right)\frac{d}{n}\right]\right]\\
&\leq\sum_{\substack{S\subseteq E_{\overline{b}c}(H,\mathbf{x}):\\S\neq\varnothing}}\left(\frac{2d}{n}\right)^{|S|}\cdot\prod_{uv\in E_{\geq 2}(H)}\left[\left(1+\frac{\epsilon \mathbf{x}_u\mathbf{x}_v}{2}\right)\frac{d}{n}\right].
\end{align*}

On the other hand, for $n$ large enough, we have:
$$|E_{\overline{b}c}(H,\mathbf{x})|=|E_c(H)|+|V(H)|\cdot\big|\big([n]\setminus V(H)\big)\setminus V_b(H,\mathbf{x})\big|\leq s^2t^2 + 2st\cdot  n^{3/4}\leq 3s^2t^2 \cdot n^{3/4}.$$

Hence,
\begin{align*}
\sum_{\substack{S\subseteq E_{\overline{b}c}(H,\mathbf{x}):\\S\neq\varnothing}} \left(\frac{2d}{n}\right)^{|S|}
&=\left(1+\frac{2d}{n}\right)^{|E_{\overline{b}c}(H,\mathbf{x})|}-1=O\left(\frac{2d\cdot |E_{\overline{b}c}(H,\mathbf{x})|}{n}\right)\leq O\left(\frac{6ds^2t^2\cdot n^{\frac{3}{4}}}{n}\right)\leq \frac{1}{n^{\frac{1}{5}}},
\end{align*}
where the last inequality is true for $n$ large enough. Therefore, we have
\begin{align*}
|P_{2,H}^{wb}(\mathbf{x})|&\leq \frac{1}{n^{\frac{1}{5}}}\cdot\prod_{uv\in E_{\geq 2}(H)}\left[\left(1+\frac{\epsilon \mathbf{x}_u\mathbf{x}_v}{2}\right)\frac{d}{n}\right].
\end{align*}
\end{proof}

Now we are ready to prove \Cref{lem:lem_upper_bound_nice_W_2}.

\begin{proof}[Proof of \Cref{lem:lem_upper_bound_nice_W_2}]
 \Cref{lem:lem_prob_completely_unsafe_nice} implies that for every $V\subseteq \mathcal{S}(H)$, we have
$$\mathbb{P}\big[\big\{V\text{ is completely }H\text{-unsafe in }\mathbf{G}\big\}\big|\mathbf{x},\mathcal{E}_{wb,H}\big]\leq\left(\frac{\eta}{2}\right)^{|V|}\leq\eta^{|V|}.$$

It is now easy to see that if we take $\mathcal{E}=\mathcal{E}_{wb,H}$ and $U=\mathcal{S}(H)$, then the conditions of \Cref{lem:lem_UH_deg1_common} are satisfied. Therefore,
\begin{align*}
\left|\mathbb{E}\big[\overline{\mathbf{Y}}_H\big|\mathbf{x},\mathcal{E}_{wb,H}\big]\right|&\leq n^{\frac{K}{A}}\left(\frac{6}{\epsilon}\right)^{|E_1^a(H)|+|E_1^d(H)|+\tau(|\mathcal{S}(H)|-|U|)}\cdot\left(\frac{\epsilon d}{2n}\right)^{|E_1(H)|} \cdot\mathbb{E}\big[|\tilde{\mathbf{Y}}_{\geq 2}^H|\big|\mathbf{x},\mathcal{E}_{wb,H}\big]\\
&\stackrel{(\ast)}{\leq} n^{\frac{2K}{A}}\left(\frac{\epsilon d}{2n}\right)^{|E_1(H)|} \cdot\mathbb{E}\big[|\tilde{\mathbf{Y}}_{\geq 2}^H|\big|\mathbf{x},\mathcal{E}_{wb,H}\big]\stackrel{(\dagger)}{\leq} n^{\frac{K}{A}}\left(\frac{\epsilon d}{2n}\right)^{|E_1(H)|},
\end{align*}
where $(\ast)$ follows from \Cref{lem:lem_UH_deg1_common_E_1d} and the fact that $E_1^a(H)=\varnothing$ and $U=\mathcal{S}(H)$, and $(\dagger)$ follows from the fact that $|\tilde{\mathbf{Y}}_{\geq 2}^H|\leq 1$. Combining this with \Cref{lem:lem_Prob_Well_Behaved_Y_2}, we get:
\begin{align*}
\left|\mathbb{E}\big[\overline{\mathbf{Y}}_H\big|\mathbf{x},\mathcal{E}_{wb,H}\big]\cdot P_{2,H}^{wb}(\mathbf{x})\right|&\leq \frac{n^{\frac{2K}{A}}}{n^{\frac{1}{5}}} \left(\frac{\epsilon d}{2n}\right)^{|E_1(H)|}\cdot \prod_{uv\in E_{\geq 2}(H)}\left[\left(1+\frac{\epsilon \mathbf{x}_u\mathbf{x}_v}{2}\right)\frac{d}{n}\right]\\
&\leq \frac{1}{n^{\frac{1}{6}}} \left(\frac{\epsilon d}{2n}\right)^{|E_1(H)|}\cdot \prod_{uv\in E_{\geq 2}(H)}\left[\left(1+\frac{\epsilon \mathbf{x}_u\mathbf{x}_v}{2}\right)\frac{d}{n}\right],
\end{align*}
where the last inequality is true if $A>\max\{100K,1\}$ and $n$ is large enough. Therefore,

\begin{align*}
\mathbb{E}\left[\left|\mathbb{E}\big[\overline{\mathbf{Y}}_H\big|\mathbf{x},\mathcal{E}_{wb,H}\big]\cdot P_{2,H}^{wb}(\mathbf{x})\right|\right]&\leq \frac{1}{n^{\frac{1}{6}}} \left(\frac{\epsilon d}{2n}\right)^{|E_1(H)|}\cdot \mathbb{E}\left[\prod_{uv\in E_{\geq 2}(H)}\left[\left(1+\frac{\epsilon \mathbf{x}_u\mathbf{x}_v}{2}\right)\frac{d}{n}\right]\right]\\
&\stackrel{(\ddagger)}{=}\resizebox{0.6\textwidth}{!}{$\displaystyle \frac{1}{n^{\frac{1}{6}}} \left(\frac{\epsilon d}{2n}\right)^{|E_1(H)|}\prod_{uv\in E_{\geq 2}(H)}\left(\frac{d}{n}\right)=\frac{1}{n^{\frac{1}{6}}}  \left(\frac{\epsilon d}{2n}\right)^{|E_1(H)|} \left(\frac{d}{n}\right)^{|E_{\geq 2}(H)|}$},
\end{align*}
where $(\ddagger)$ follows from \Cref{lem:lem_Expectation_Y_Nice} and the fact that $E_{\geq 2}(H)$ is a forest.
\end{proof}

\subsubsection{Tight bounds on the significant part of the contribution of the well-behaved event in the case of pleasant multigraphs}
\label{subsubsec:subsubsec_bounding_well_behaved_significant}

In order to prove \Cref{lem:lem_main_lemma_pleasant_wb}, we need a few definitions and lemmas.

\begin{definition}
For every $v\in V(H)$, let $\mathbf{D}^H_v$ be the number of edges in $\mathbf{G}$ between $v$ and $V_b(H,\mathbf{x})$. We denote $(\mathbf{D}^H_v)_{v\in V(H)}$ as $\mathbf{D}^H$.

Note that if $\mathcal{E}_{wb,H}$ occurs, then\footnote{Recall \Cref{def:def_in_out_degree}}
$$d_{\mathbf{G}-G(H)}(v)=d_{\mathbf{G}-G(H)}^o(v)=\mathbf{D}^H_v.$$
Furthermore, $|V_b^+(H,\mathbf{x})|=\left\lceil\frac{n}{2}-n^{\frac{3}{4}}\right\rceil$ and $|V_b^-(H,\mathbf{x})|=\left\lceil\frac{n}{2}-n^{\frac{3}{4}}\right\rceil$, where
$$V_b^+(H,\mathbf{x})=\big\{u\in V_b(H,\mathbf{x}):\; \mathbf{x}_u=+1\big\},$$
and
$$V_b^-(H,\mathbf{x})=\big\{u\in V_b(H,\mathbf{x}):\; \mathbf{x}_u=-1\big\}.$$
\end{definition}

It is easy to see that given $\mathbf{x}$ that is approximately balanced on $[n]\setminus V(H)$, the random variables $\{\mathbf{D}^H_v\}_{v\in V(H)}$ are conditionally mutually independent, and they are conditionally independent of $(\mathcal{E}_{H,g}, \mathcal{E}_{H,d})$.

\begin{lemma}
\label{lem:lem_prob_nice_d}
Let $H$ be a multigraph with at most $st=sK\log n$ vertices. For every $v\in V(H)$, and every $\mathsf{d}=(\mathsf{d}_v)_{v\in V(H)}\in \mathbb{N}^{V(H)}$, we have:
\begin{align*}
\resizebox{0.95\textwidth}{!}{$\displaystyle\mathbb{P}\big[\mathbf{D}^H=\mathsf{d}\big|\mathbf{x},\mathcal{E}_{wb,H}\big]=\mathbb{P}\big[\mathbf{D}^H=\mathsf{d}\big|\mathcal{E}_{wb,H}\big]=\mathbb{P}\big[\mathbf{D}^H=\mathsf{d}\big|\mathbf{x},\mathcal{E}_{H,b}\big]=\mathbb{P}\big[\mathbf{D}^H=\mathsf{d}\big|\mathcal{E}_{H,b}\big]=P_{wb}(\mathsf{d}),$}
\end{align*}
where
\begin{align*}
P_{wb}(\mathsf{d})&=\prod_{v\in V(H)} P_{wb}(\mathsf{d}_v)
\end{align*}
and
\begin{align*}
P_{wb}(\mathsf{d}_v)&=\sum_{a=0}^{\mathsf{d}_v}\left[{\big\lceil\frac{n}{2}-n^{\frac{3}{4}}\big\rceil\choose a}\cdot \left[\left(1+\frac{\epsilon}{2}\right)\frac{d}{n}\right]^a\cdot \left[1-\left(1+\frac{\epsilon}{2}\right)\frac{d}{n}\right]^{\big\lceil\frac{n}{2}-n^{\frac{3}{4}}\big\rceil-a}\right]\\
&\quad\quad\quad\quad\quad\quad\quad\times \left[{\big\lceil\frac{n}{2}-n^{\frac{3}{4}}\big\rceil\choose \mathsf{d}_v - a}\cdot \left[\left(1-\frac{\epsilon}{2}\right)\frac{d}{n}\right]^{\mathsf{d}_v-a}\cdot \left[1-\left(1-\frac{\epsilon}{2}\right)\frac{d}{n}\right]^{\big\lceil\frac{n}{2}-n^{\frac{3}{4}}\big\rceil-\mathsf{d}_v+a}\right].
\end{align*}
Furthermore, for every $v\in V(H)$, we have
$$\mathbb{P}\big[\mathbf{D}^H_v=\mathsf{d}_v\big|\mathbf{x},\mathcal{E}_{H,b}\big]=\mathbb{P}\big[\mathbf{D}^H_v=\mathsf{d}_v\big|\mathcal{E}_{H,b}\big]=P_{wb}(\mathsf{d}_v).$$

In other words, $\mathbb{P}\big[\mathbf{D}^H=\mathsf{d}\big|\mathbf{x},\mathcal{E}_{wb,H}\big]$ depends only on $\mathsf{d}=(\mathsf{d}_v)_{v\in V(H)}$, i.e., it does not depend on $\mathbf{x}$, and it depends on $\mathbf{G}$ only through $\mathbf{D}^H=(\mathbf{D}^H_v)_{v\in V(H)}$. Furthermore, given $\mathcal{E}_{wb,H}$, the random variables $(\mathbf{D}^H_v)_{v\in V(H)}$ are conditionally mutually independent. The same is true if we condition on $\mathcal{E}_{H,b}$ instead of $\mathcal{E}_{wb,H}$.

Note that if $\mathsf{d}_v>2\left\lceil\frac{n}{2}-n^{\frac{3}{4}}\right\rceil$, then $P_{wb}(\mathsf{d}_v)=0$.
\end{lemma}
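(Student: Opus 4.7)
The plan is to reduce the computation to a direct calculation using the conditional independence structure of the SBM edges, together with the defining properties of $V_b(H,\mathbf{x})$ and of the well-behaved event.

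First, I would observe that whenever $\mathcal{E}_{H,b}$ holds, the set $V_b(H,\mathbf{x})$ consists of exactly $\lceil n/2 - n^{3/4}\rceil$ vertices from each community, so $|V_b^+(H,\mathbf{x})| = |V_b^-(H,\mathbf{x})| = \lceil n/2 - n^{3/4}\rceil$. For a fixed $v\in V(H)$, I would decompose $\mathbf{D}^H_v = \mathbf{D}^{H,+}_v + \mathbf{D}^{H,-}_v$, where $\mathbf{D}^{H,\pm}_v$ counts edges from $v$ to $V_b^\pm(H,\mathbf{x})$. Given $\mathbf{x}$, each such edge is independently present in $\mathbf{G}$, so $\mathbf{D}^{H,+}_v$ is binomial with $\lceil n/2 - n^{3/4}\rceil$ trials and success probability $(1+\frac{\epsilon}{2}\mathbf{x}_v)\frac{d}{n}$ if $\mathbf{x}_v = +1$ (i.e.\ same community), and analogously for $\mathbf{D}^{H,-}_v$. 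In either case — regardless of the sign of $\mathbf{x}_v$ — one of the two binomials has parameter $(1+\frac{\epsilon}{2})\frac{d}{n}$ and the other has parameter $(1-\frac{\epsilon}{2})\frac{d}{n}$. Convolving these two independent binomials yields exactly the formula $P_{wb}(\mathsf{d}_v)$ stated in the lemma, and crucially the result does not depend on $\mathbf{x}_v$. This gives $\mathbb{P}[\mathbf{D}^H_v = \mathsf{d}_v \mid \mathbf{x},\mathcal{E}_{H,b}] = P_{wb}(\mathsf{d}_v)$.

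Next, since the edges counted by $\mathbf{D}^H_v$ for different $v \in V(H)$ are edge-disjoint (each has one endpoint in $V(H)$ and the other in the disjoint set $V_b(H,\mathbf{x})\subseteq [n]\setminus V(H)$, and the $V(H)$-endpoints differ), these collections of indicator random variables are conditionally mutually independent given $\mathbf{x}$. Hence the joint conditional distribution factors as a product, establishing $\mathbb{P}[\mathbf{D}^H = \mathsf{d} \mid \mathbf{x}, \mathcal{E}_{H,b}] = \prod_{v\in V(H)} P_{wb}(\mathsf{d}_v) = P_{wb}(\mathsf{d})$. Since this is independent of $\mathbf{x}$, one can drop the conditioning on $\mathbf{x}$ and obtain $\mathbb{P}[\mathbf{D}^H = \mathsf{d} \mid \mathcal{E}_{H,b}] = P_{wb}(\mathsf{d})$ as well.

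Finally, I would upgrade the conditioning from $\mathcal{E}_{H,b}$ to $\mathcal{E}_{wb,H} = \mathcal{E}_{H,b} \cap \mathcal{E}_{H,g} \cap \mathcal{E}_{H,d}$. The key observation is that $\mathcal{E}_{H,g}$ depends only on the edges in $E_c(H) \cup \{uv : u\in V(H),\ v\in ([n]\setminus V(H))\setminus V_b(H,\mathbf{x})\}$, while $\mathcal{E}_{H,d}$ depends only on edges in $E_{\geq 2}(H) \subseteq E(H)$. None of these edges is of the form $\{uv : u\in V(H),\ v\in V_b(H,\mathbf{x})\}$, so given $\mathbf{x}$ (and hence given $V_b(H,\mathbf{x})$), the events $\mathcal{E}_{H,g}$ and $\mathcal{E}_{H,d}$ are conditionally independent of the edges that determine $\mathbf{D}^H$. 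Therefore $\mathbb{P}[\mathbf{D}^H = \mathsf{d} \mid \mathbf{x}, \mathcal{E}_{wb,H}] = \mathbb{P}[\mathbf{D}^H = \mathsf{d} \mid \mathbf{x}, \mathcal{E}_{H,b}] = P_{wb}(\mathsf{d})$, and removing the conditioning on $\mathbf{x}$ gives $\mathbb{P}[\mathbf{D}^H = \mathsf{d} \mid \mathcal{E}_{wb,H}] = P_{wb}(\mathsf{d})$ as well. The entire argument is essentially bookkeeping of conditional independence; I do not expect a serious obstacle, but care is needed to verify the edge-disjointness claim and to make the independence argument for $\mathcal{E}_{H,g}, \mathcal{E}_{H,d}$ precise, since $V_b(H,\mathbf{x})$ itself depends on $\mathbf{x}$ and we must work in the conditional probability space where $\mathbf{x}$ is fixed.
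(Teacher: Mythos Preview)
Your proposal is correct and follows essentially the same approach as the paper: use that $\mathcal{E}_{H,g}$ and $\mathcal{E}_{H,d}$ depend on edge sets disjoint from $\{uv:u\in V(H),v\in V_b(H,\mathbf{x})\}$ to reduce the conditioning to $\mathcal{E}_{H,b}$, factor over $v$ by conditional edge-independence, and observe that the convolution of the two binomials on $V_b^+(H,\mathbf{x})$ and $V_b^-(H,\mathbf{x})$ is symmetric in $\mathbf{x}_v$ and hence yields $P_{wb}(\mathsf{d}_v)$ regardless of $\mathbf{x}$. The only cosmetic difference is that the paper first strips off $\mathcal{E}_{H,g}\cap\mathcal{E}_{H,d}$ and then computes, whereas you compute under $\mathcal{E}_{H,b}$ first and then upgrade the conditioning.
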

\begin{proof}
We have
\begin{align*}
\mathbb{P}\big[\mathbf{D}^H=\mathsf{d}\big|\mathbf{x},\mathcal{E}_{wb,H}\big]&=\mathbb{P}\big[\mathbf{D}^H=\mathsf{d}\big|\mathbf{x},\mathcal{E}_{H,b}\cap \mathcal{E}_{H,g}\cap \mathcal{E}_{H,d}\big]\\
&\stackrel{(*)}{=}\mathbb{P}\big[\mathbf{D}^H=\mathsf{d}\big|\mathbf{x},\mathcal{E}_{H,b}\big]
=\prod_{v\in V(H)}\mathbb{P}\big[\mathbf{D}^H_v=\mathsf{d}_v\big|\mathbf{x},\mathcal{E}_{H,b}\big],
\end{align*}
where $(\ast)$ follows from the fact that given $\mathbf{x}$ that is approximately balanced on $[n]\setminus V(H)$, the random variable $\mathbf{D}^H$ is conditionally independent from $( \mathcal{E}_{H,g}, \mathcal{E}_{H,d})$.

Now for every $v\in V(H)$, we have
\begin{align*}
&\mathbb{P}\big[\mathbf{D}^H_v=\mathsf{d}_v\big|\mathbf{x},\mathcal{E}_{H,b}\big]\\
&=\sum_{\substack{U\subseteq V_b(H,\mathbf{x}):\\|U|=\mathsf{d}_v}}\left[\prod_{u\in U}\mathbb{P}[uv\in \mathbf{G}|\mathbf{x},\mathcal{E}_{H,b}]\right]\cdot \left[\prod_{u\in V_b(H,\mathbf{x})\setminus U}\mathbb{P}[uv\notin \mathbf{G}|\mathbf{x},\mathcal{E}_{H,b}]\right]\\
&\quad\quad=\sum_{a=0}^{\mathsf{d}_v}\left(\sum_{\substack{U^+\subseteq V_b^+(H,\mathbf{x}):\\|U^+|=a}}\left[\prod_{u\in U^+}\mathbb{P}[uv\in \mathbf{G}|\mathbf{x},\mathcal{E}_{H,b}]\right]\cdot \left[\prod_{u\in V_b^+(H,\mathbf{x})\setminus U^+}\mathbb{P}[uv\notin \mathbf{G}|\mathbf{x},\mathcal{E}_{H,b}]\right]\right)\\
&\quad\quad\quad\quad\times \left(\sum_{\substack{U^-\subseteq V_b^-(H,\mathbf{x}):\\|U^-|=\mathsf{d}_v-a}}\left[\prod_{u\in U^-}\mathbb{P}[uv\in \mathbf{G}|\mathbf{x},\mathcal{E}_{H,b}]\right]\cdot \left[\prod_{u\in V_b^-(H,\mathbf{x})\setminus U^-}\mathbb{P}[uv\notin \mathbf{G}|\mathbf{x},\mathcal{E}_{H,b}]\right]\right),
\end{align*}
hence,
\begin{align*}
\mathbb{P}\big[\mathbf{D}^H_v=\mathsf{d}_v\big|\mathbf{x},\mathcal{E}_{H,b}\big]&=\sum_{a=0}^{\mathsf{d}_v}\left[{\big\lceil\frac{n}{2}-n^{\frac{3}{4}}\big\rceil\choose a}\left[\left(1+\frac{\epsilon \mathbf{x}_v}{2}\right)\frac{d}{n}\right]^a\cdot \left[1-\left(1+\frac{\epsilon \mathbf{x}_v}{2}\right)\frac{d}{n}\right]^{\big\lceil\frac{n}{2}-n^{\frac{3}{4}}\big\rceil-a}\right]\\
&\quad\quad\times \left[{\big\lceil\frac{n}{2}-n^{\frac{3}{4}}\big\rceil\choose \mathsf{d}_v - a}\left[\left(1-\frac{\epsilon \mathbf{x}_v}{2}\right)\frac{d}{n}\right]^{\mathsf{d}_v-a}\cdot \left[1-\left(1-\frac{\epsilon \mathbf{x}_v}{2}\right)\frac{d}{n}\right]^{\big\lceil\frac{n}{2}-n^{\frac{3}{4}}\big\rceil-\mathsf{d}_v+a}\right]\\
&=\sum_{a=0}^{\mathsf{d}_v}\left[{\big\lceil\frac{n}{2}-n^{\frac{3}{4}}\big\rceil\choose a}\left[\left(1+\frac{\epsilon}{2}\right)\frac{d}{n}\right]^a\cdot \left[1-\left(1+\frac{\epsilon}{2}\right)\frac{d}{n}\right]^{\big\lceil\frac{n}{2}-n^{\frac{3}{4}}\big\rceil-a}\right]\\
&\quad\quad\quad\quad\times \left[{\big\lceil\frac{n}{2}-n^{\frac{3}{4}}\big\rceil\choose \mathsf{d}_v - a}\left[\left(1-\frac{\epsilon}{2}\right)\frac{d}{n}\right]^{\mathsf{d}_v-a}\cdot \left[1-\left(1-\frac{\epsilon}{2}\right)\frac{d}{n}\right]^{\big\lceil\frac{n}{2}-n^{\frac{3}{4}}\big\rceil-\mathsf{d}_v+a}\right]\\
&=P_{wb}(\mathsf{d}_v)=\mathbb{P}\big[\mathbf{D}^H_v=\mathsf{d}_v\big|\mathcal{E}_{H,b}\big],
\end{align*}
where the last equality follows from the fact that $\mathbb{P}\big[\mathbf{D}^H_v=\mathsf{d}_v\big|\mathbf{x},\mathcal{E}_{H,b}\big]$ does not depend on $\mathbf{x}$. We conclude that

$$\mathbb{P}\big[\mathbf{D}^H=\mathsf{d}\big|\mathbf{x},\mathcal{E}_{wb,H}\big]=\prod_{v\in V(H)} P_{wb}(\mathsf{d}_v)=P_{wb}(\mathsf{d}).$$

Furthermore, since $\mathbb{P}\big[\mathbf{D}^H=\mathsf{d}\big|\mathbf{x},\mathcal{E}_{wb,H}\big]=\mathbb{P}\big[\mathbf{D}^H=\mathsf{d}\big|\mathbf{x},\mathcal{E}_{H,b}\big]$ does not depend on $\mathbf{x}$, we have
$$\mathbb{P}\big[\mathbf{D}^H=\mathsf{d}\big|\mathbf{x},\mathcal{E}_{wb,H}\big]=\mathbb{P}\big[\mathbf{D}^H=\mathsf{d}\big|\mathcal{E}_{wb,H}\big]=\mathbb{P}\big[\mathbf{D}^H=\mathsf{d}\big|\mathbf{x},\mathcal{E}_{H,b}\big]=\mathbb{P}\big[\mathbf{D}^H=\mathsf{d}\big|\mathcal{E}_{H,b}\big].$$
\end{proof}

\begin{lemma}
\label{lem:lem_nice_L_definition}
Let $H$ be a multigraph with at most $st=sK\log n$ vertices. We have 

\begin{equation*}
%\label{eq:eq_lem_lower_bound_nice_significant_eq_1}
\begin{aligned}
\mathbb{E}\big[\overline{\mathbf{Y}}_H\big|\mathbf{x},\mathcal{E}_{wb,H}\big]=\sum_{\mathsf{d}\in\mathbb{N}^{V(H)}}\mathbb{E}\big[\overline{\mathbf{Y}}_H\big|\mathbf{x},\mathcal{E}_{wb,H},\mathbf{D}^H=\mathsf{d}\big]\cdot P_{wb}(\mathsf{d}),
\end{aligned}
\end{equation*}
where $P_{wb}(\mathsf{d})$ is as in \Cref{lem:lem_prob_nice_d}.
\end{lemma}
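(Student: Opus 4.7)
The plan is to apply the law of total expectation by conditioning on the random vector $\mathbf{D}^H = (\mathbf{D}^H_v)_{v\in V(H)}$. Since $\mathbf{D}^H$ takes values in the countable set $\mathbb{N}^{V(H)}$, the tower property gives
\begin{equation*}
\mathbb{E}\big[\overline{\mathbf{Y}}_H\big|\mathbf{x},\mathcal{E}_{wb,H}\big]=\sum_{\mathsf{d}\in\mathbb{N}^{V(H)}}\mathbb{E}\big[\overline{\mathbf{Y}}_H\big|\mathbf{x},\mathcal{E}_{wb,H},\mathbf{D}^H=\mathsf{d}\big]\cdot \mathbb{P}\big[\mathbf{D}^H=\mathsf{d}\big|\mathbf{x},\mathcal{E}_{wb,H}\big].
\end{equation*}
Note that strictly speaking we should only sum over those $\mathsf{d}$ for which the conditioning event has positive probability, but the contribution of the remaining $\mathsf{d}$ is zero by convention (or can simply be omitted), so the stated identity is unchanged.

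The only substantive input needed is the identification
\[
\mathbb{P}\big[\mathbf{D}^H=\mathsf{d}\big|\mathbf{x},\mathcal{E}_{wb,H}\big] = P_{wb}(\mathsf{d}),
\]
and this is exactly the content of Lemma \ref{lem:lem_prob_nice_d}. Substituting this into the tower expansion yields the claimed formula directly.

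There is no real obstacle here; the statement is essentially a bookkeeping consequence of the tower property combined with the previously established fact that the conditional distribution of $\mathbf{D}^H$ given $(\mathbf{x},\mathcal{E}_{wb,H})$ depends neither on $\mathbf{x}$ nor on the additional conditioning events $\mathcal{E}_{H,g}\cap\mathcal{E}_{H,d}$ inside $\mathcal{E}_{wb,H}$. The usefulness of the lemma lies not in its proof but in setting up the decomposition that will be combined, in the next step, with an analysis of $\mathbb{E}\big[\overline{\mathbf{Y}}_H\big|\mathbf{x},\mathcal{E}_{wb,H},\mathbf{D}^H=\mathsf{d}\big]$ as a polynomial whose monomials involve only $(\mathbf{x}_u\mathbf{x}_v)_{uv\in E(H)}$ — the crucial property that ultimately allows Lemma \ref{lem:lem_Expectation_Y_Nice} to eliminate almost all terms when the outer expectation over $\mathbf{x}$ is taken.
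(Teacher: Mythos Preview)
Your proof is correct and matches the paper's approach exactly: apply the law of total expectation over the values of $\mathbf{D}^H$, then invoke Lemma~\ref{lem:lem_prob_nice_d} to replace $\mathbb{P}[\mathbf{D}^H=\mathsf{d}\mid\mathbf{x},\mathcal{E}_{wb,H}]$ by $P_{wb}(\mathsf{d})$. The paper's proof is essentially the two-line computation you wrote, without the additional commentary.
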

\begin{proof}
We have:

\begin{align*}
\mathbb{E}\big[\overline{\mathbf{Y}}_H\big|\mathbf{x},\mathcal{E}_{wb,H}\big]&=\sum_{\mathsf{d}\in\mathbb{N}^{V(H)}}\mathbb{E}\big[\overline{\mathbf{Y}}_H\big|\mathbf{x},\mathcal{E}_{wb,H},\mathbf{D}^H=\mathsf{d}\big]\cdot\mathbb{P}[\mathbf{D}^H=\mathsf{d}|\mathbf{x},\mathcal{E}_{wb,H}]\\
&=\sum_{\mathsf{d}\in\mathbb{N}^{V(H)}}\mathbb{E}\big[\overline{\mathbf{Y}}_H\big|\mathbf{x},\mathcal{E}_{wb,H},\mathbf{D}^H=\mathsf{d}\big]\cdot P_{wb}(\mathsf{d}).
\end{align*}
\end{proof}

In the following, we will fix $\mathsf{d}\in\mathbb{N}^{V(H)}$ and focus on studying $\mathbb{E}\big[\overline{\mathbf{Y}}_H\big|\mathbf{x},\mathcal{E}_{wb,H},\mathbf{D}^H=\mathsf{d}\big]$.

\begin{definition}
Let $H$ be an arbitrary multigraph and let $\mathsf{d}\in\mathbb{N}^{V(H)}$. For every $v\in V(E_1(H))$, we define the $\mathsf{d}$-\emph{criticality} of $v$ in $H$ as $c_{\mathsf{d}}^H(v)=\Delta-d^H_{\geq 2}(v)-\mathsf{d}_v$. If $d^H_1(v)\leq c_{\mathsf{d}}^H(v)$, we say that $v$ is $\mathsf{d}$-\emph{safe in $H$}. If $d^H_1(v)> c_{\mathsf{d}}^H(v)$, we say that $v$ is $\mathsf{d}$-\emph{unsafe in $H$}.

An edge $uv\in E_1(H)$ is said to be $\mathsf{d}$-\emph{safe} if both $u$ and $v$ are $\mathsf{d}$-safe. We denote the set of $\mathsf{d}$-safe edges in $E_1(H)$ as $S_{\mathsf{d}}^H$.
\end{definition}

\begin{remark}
\label{rem:rem_safe_critical_vertices}
If $\mathbf{D}=\mathsf{d}$ and $\mathcal{E}_{wb,H}$ occurs, then since all the edges of multiplicity at least 2 are present in $\mathbf{G}$, we can see that a vertex $v\in V(E_1(H))$ causes truncation if and only if
$$\big|\big\{e\in E_1(H)\cap\mathbf{G}:\;e\text{ is incident to }v\big\}\big|> c_{\mathsf{d}}^H(v).$$

As can be easily seen, the criticality of a vertex $v$ is the maximum number of edges from $\big\{e\in E_1(H):\; e\text{ is incident to }v\big\}$ which can be present in $\mathbf{G}$ without causing truncation.

If $v$ is $\mathsf{d}$-safe, then we are sure that $v$ does not cause truncation. If $uv$ is a $\mathsf{d}$-safe vertex, then we are sure that its presence in $\mathbf{G}$ will not cause truncation.
\end{remark}

\begin{definition}
Let $H$ be an arbitrary multigraph and let $\mathsf{d}\in\mathbb{N}^{V(H)}$. For every $\mathsf{E}\subseteq E_1(H)$ and every $\mathsf{d}\in\mathbb{N}^{V(H)}$, we say that $\mathsf{E}$ is \emph{$\mathsf{d}$-structurally safe} if for every $v\in V(E_1(H))$ we have $d_{\mathsf{E}}(v)\leq c_{\mathsf{d}}^H(v)$. For every $\mathsf{E}\subseteq E_1(H)$, define:
$$\mathcal{S}_{\mathsf{d}}^H(\mathsf{E})=\big\{\mathsf{S}\subseteq \mathsf{E}:\; \mathsf{S}\text{ is }\mathsf{d}\text{-structurally safe}\big\}.$$
\end{definition}

\begin{remark}
\label{rem:rem_S_C_divide_on_Agreeables}
Let $H$ be an $(s,t)$-pleasant multigraph and let $H^{(1)},\ldots,H^{(r_H)}$ be the agreeable components of $H$. Since $H^{(1)},\ldots,H^{(r_H)}$ are vertex-disjoint. We can deduce from this that
$$\mathcal{S}_{\mathsf{d}}^H(E_1(H))=\left\{\mathsf{S}_1\cup\ldots\cup\mathsf{S}_{r_H}:\; \forall i\in[r_H], \;\mathsf{S}_{i}\in \mathcal{S}_{\mathsf{d}}^H\big(E_1\big(H^{(i)}\big)\big)\right\},$$
and
$$\left\{\mathsf{E}_{\geq 2}:\mathsf{E}_{\geq 2}\subseteq E_{\geq 2}\big(H^{(\ast)}\big)\right\}=\left\{\mathsf{E}^{(1)}_{\geq 2}\cup\ldots\cup \mathsf{E}^{(r_H)}_{\geq 2}:\;\forall i\in[r_H], \;\mathsf{E}^{(i)}_{\geq 2}\subseteq E_{\geq 2}\big(H^{(i)}\big)\right\},$$
where $H^{(\ast)}=\displaystyle\bigcup_{i\in[r_H]} H^{(i)}$.
\end{remark}

\begin{lemma}
\label{lem:lem_lemma_pleasant_decompose_on_agreeables}
Let $H$ be an $(s,t)$-pleasant multigraph and let $H^{(1)},\ldots,H^{(r_H)}$ be its agreeable components\footnote{See \Cref{def:def_pleasant_multigraphs}}. For every $\mathsf{d}\in\mathbb{N}^{V(H)}$, we have
\begin{align*}
&\mathbb{E}\left[\mathbb{E}\big[\overline{\mathbf{Y}}_H\big|\mathbf{x},\mathcal{E}_{wb,H},\mathbf{D}^H=\mathsf{d}\big]\cdot P_{1,H}^{wb}(\mathbf{x})\right]\\
&\quad\quad\quad\quad\quad\quad\quad\quad=\left(\frac{\tilde{d}}{n}\right)^{|E_1(H)|}\left(1-\frac{d}{n}\right)^{m_H(E_{\geq 2}(H))}\left(\frac{d}{n}\right)^{|E_{\geq 2}(H)|}\cdot\mathbb{P}\left[\mathcal{E}_{H,b}\right]\cdot\prod_{i\in[r_H]}J_{H,\mathsf{d},i},
\end{align*}
where
\begin{equation}
\label{eq:eq_def_J_H_d_i}
\begin{aligned}
J_{H,\mathsf{d},i}&=\sum_{\mathsf{S}_i\in\mathcal{S}_{\mathsf{d}}^H(E_1(H^{(i)}))} \sum_{\mathsf{S}_i'\subseteq \mathsf{S}_i}\sum_{\mathsf{S}_i''\subseteq E_1(H^{(i)})\setminus \mathsf{S}_i}\sum_{\mathsf{E}_{\geq 2}^{(i)}\subseteq E_{\geq 2}(H^{(i)})}\left(\frac{\epsilon}{2}\right)^{|\mathsf{S}_i'|+|\mathsf{E}_{\geq 2}^{(i)}|}\cdot \left(\frac{\tilde{\epsilon}}{n}\right)^{|\mathsf{S}_i''|}\\
&\quad\quad\quad\quad\quad\quad\quad\quad\quad\quad\quad\quad\quad\quad\times
(-1)^{|E_1(H^{(i)})|-|\mathsf{S}_i|-|\mathsf{S}_i''|}\cdot\mathbb{E}\left[\prod_{uv\in \mathsf{S}'_i\cup \mathsf{S}''_i\cup \mathsf{E}^{(i)}_{\geq 2}} \mathbf{x}_u\mathbf{x}_v\right],
\end{aligned}
\end{equation}
$$\tilde{d}=d\left(1-\frac{d}{n}\right),\quad\quad\text{and}\quad\quad\tilde{\epsilon}=\frac{\epsilon d^2}{2\tilde{d}}.$$
\end{lemma}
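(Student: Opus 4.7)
}

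The plan is to compute the inner conditional expectation explicitly, expand all factors as polynomials in the monomials $(\mathbf{x}_u\mathbf{x}_v)_{uv\in E(H)}$, take the expectation over $\mathbf{x}$, and then factorize across the agreeable components $H^{(1)},\dots,H^{(r_H)}$.

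First, I will compute $\mathbb{E}[\overline{\mathbf{Y}}_H\mid \mathbf{x},\mathcal{E}_{wb,H},\mathbf{D}^H=\mathsf{d}]$. Under $\mathcal{E}_{wb,H}$, conditions (2)--(4) of \Cref{def:def_well_behaved_H} pin down the status in $\mathbf{G}$ of every edge outside $E_1(H)$: all of $E_{\geq 2}(H)$ is present, no $H$-cross-edge is present, and no edge to $([n]\setminus V(H))\setminus V_b(H,\mathbf{x})$ is present. Combined with $\mathbf{D}^H=\mathsf{d}$, this forces $d_{\mathbf{G}}(v)=d_{\geq 2}^H(v)+\mathsf{d}_v+d^{E_1(H)\cap \mathbf{G}}(v)$, so the truncation indicator inside $\overline{\mathbf{Y}}_H$ becomes exactly $\mathbbm{1}\{E_1(H)\cap\mathbf{G}\in\mathcal{S}_{\mathsf{d}}^H(E_1(H))\}$ (cf.\ \Cref{rem:rem_safe_critical_vertices}). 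Since none of the conditioning events involves edges of $E_1(H)$, the edges of $E_1(H)$ remain conditionally independent given $\mathbf{x}$, each $uv$ being present with probability $p_{uv}=(1+\frac{\epsilon \mathbf{x}_u\mathbf{x}_v}{2})\frac{d}{n}$. Using that $\mathbf{Y}_{uv}=1-\frac{d}{n}$ when $uv\in\mathbf{G}$ and $\mathbf{Y}_{uv}=-\frac{d}{n}$ otherwise, a straightforward sum over the set of present $E_1$-edges yields
\[
\mathbb{E}\big[\overline{\mathbf{Y}}_H\bigm|\mathbf{x},\mathcal{E}_{wb,H},\mathbf{D}^H=\mathsf{d}\big]
=\Big(1-\tfrac{d}{n}\Big)^{m_H(E_{\geq 2}(H))}
\sum_{\mathsf{S}\in\mathcal{S}_{\mathsf{d}}^H(E_1(H))}\Big[\prod_{uv\in\mathsf{S}}p_{uv}(1-\tfrac{d}{n})\Big]\Big[\prod_{uv\in E_1(H)\setminus\mathsf{S}}(1-p_{uv})(-\tfrac{d}{n})\Big].
\]

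Next, I multiply by $P_{1,H}^{wb}(\mathbf{x})=\prod_{uv\in E_{\geq 2}(H)}p_{uv}\cdot \mathbbm{1}_{\mathcal{E}_{H,b}}(\mathbf{x})$ from \cref{eq:eq_Prob_Well_Behaved_Y_1} and rewrite each edge factor in terms of the parameters $\tilde d=d(1-d/n)$ and $\tilde\epsilon=\epsilon d^2/(2\tilde d)$: for $uv\in\mathsf{S}$, $p_{uv}(1-d/n)=\frac{\tilde d}{n}(1+\frac{\epsilon \mathbf{x}_u\mathbf{x}_v}{2})$; for $uv\in E_1(H)\setminus\mathsf{S}$, $(1-p_{uv})(-d/n)=-\frac{\tilde d}{n}(1-\frac{\tilde\epsilon\, \mathbf{x}_u\mathbf{x}_v}{n})$; and for $uv\in E_{\geq 2}(H)$, $p_{uv}=\frac{d}{n}(1+\frac{\epsilon \mathbf{x}_u\mathbf{x}_v}{2})$. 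Expanding each $(1\pm\dots)$ factor as a sum over the subset of edges that contribute the $\mathbf{x}_u\mathbf{x}_v$-term produces auxiliary sets $\mathsf{S}'\subseteq\mathsf{S}$, $\mathsf{S}''\subseteq E_1(H)\setminus\mathsf{S}$, $\mathsf{E}_{\geq 2}\subseteq E_{\geq 2}(H)$, and collects exactly the prefactor $(\tilde d/n)^{|E_1(H)|}(1-d/n)^{m_H(E_{\geq 2}(H))}(d/n)^{|E_{\geq 2}(H)|}$ in front of the inner sum.

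I will then take the expectation over $\mathbf{x}$. Since $\mathcal{E}_{H,b}$ is measurable with respect to $(\mathbf{x}_v)_{v\in [n]\setminus V(H)}$ while every remaining $\mathbf{x}_u\mathbf{x}_v$ involves only vertices in $V(H)$, the independence of $(\mathbf{x}_v)_{v\in V(H)}$ and $(\mathbf{x}_v)_{v\notin V(H)}$ lets $\mathbb{P}[\mathcal{E}_{H,b}]$ factor out of the expectation, matching the formula. Now I will argue that only terms with $\mathsf{E}_{\geq 2}\subseteq E_{\geq 2}(H^{(\ast)})=\bigcup_i E_{\geq 2}(H^{(i)})$ survive: by \Cref{lem:lem_Expectation_Y_Nice}, $\mathbb{E}[\prod_{uv\in\mathsf{S}'\cup\mathsf{S}''\cup\mathsf{E}_{\geq 2}}\mathbf{x}_u\mathbf{x}_v]\neq 0$ only when the edge set is an edge-disjoint union of cycles, but every cycle of $H$ lies inside $H^{(\ast)}$ (by \Cref{def:def_pleasant_multigraphs}), so any bridge edge of $E_{\geq 2}(H)\setminus E_{\geq 2}(H^{(\ast)})$ that is included kills the expectation.

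Finally, since $E_1(H)$ and $E_{\geq 2}(H^{(\ast)})$ decompose disjointly across the vertex-disjoint agreeable components (cf.\ \Cref{rem:rem_S_C_divide_on_Agreeables}), I write $\mathsf{S}=\bigcup_i\mathsf{S}_i$, $\mathsf{S}'=\bigcup_i\mathsf{S}'_i$, $\mathsf{S}''=\bigcup_i\mathsf{S}''_i$, $\mathsf{E}_{\geq 2}=\bigcup_i\mathsf{E}_{\geq 2}^{(i)}$, use additivity of $|\cdot|$ in the exponents of $(-1)$, $\epsilon/2$, and $\tilde\epsilon/n$, and use the vertex-disjointness of the components together with the independence of $(\mathbf{x}_v)_{v\in V(H)}$ to factor $\mathbb{E}[\prod_{uv\in\mathsf{S}'\cup\mathsf{S}''\cup\mathsf{E}_{\geq 2}}\mathbf{x}_u\mathbf{x}_v]=\prod_i\mathbb{E}[\prod_{uv\in\mathsf{S}'_i\cup\mathsf{S}''_i\cup\mathsf{E}_{\geq 2}^{(i)}}\mathbf{x}_u\mathbf{x}_v]$. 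The resulting expression is precisely $\prod_{i\in[r_H]}J_{H,\mathsf{d},i}$ as defined in \cref{eq:eq_def_J_H_d_i}, completing the identity. The main obstacle is the bookkeeping: ensuring that bridge edges of $E_{\geq 2}(H)\setminus E_{\geq 2}(H^{(\ast)})$ drop out cleanly, and that the $(-1)$ signs and combinatorial factors agree between the sum of $\mathsf{S}$-indicator products and the per-component expression of $J_{H,\mathsf{d},i}$.
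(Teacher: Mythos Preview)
Your proposal is correct and follows essentially the same approach as the paper: compute the conditional expectation by summing over the set of present $E_1$-edges (which must be $\mathsf{d}$-structurally safe), rewrite each edge factor in terms of $\tilde d$ and $\tilde\epsilon$, expand into subset sums $\mathsf{S}',\mathsf{S}'',\mathsf{E}_{\geq 2}$, take the expectation over $\mathbf{x}$ using the independence of $\mathbbm{1}_{\mathcal{E}_{H,b}}$ from $(\mathbf{x}_v)_{v\in V(H)}$, discard any $\mathsf{E}_{\geq 2}$ containing a bridge edge outside $H^{(\ast)}$ via \Cref{lem:lem_Expectation_Y_Nice}, and then factorize across the vertex-disjoint agreeable components using \Cref{rem:rem_S_C_divide_on_Agreeables}. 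The paper carries out exactly these steps with the same intermediate identities (see \cref{eq:eq_D_E_wb_DH_D_1}--\cref{eq:eq_nice_expectation_d_P_1H}).
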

\begin{proof}
Let $\mathbf{x}$ be such that $\mathcal{E}_{H,b}$ occurs\footnote{Note that $\mathcal{E}_{H,b}$ depends only on $(\mathbf{x}_u)_{u\in [n]\setminus V(H)}$, i.e., it is $\sigma\big(\big\{\mathbf{x}_u:\;u\in [n]\setminus V(H)\big\}\big)$-measurable.}. From \Cref{rem:rem_safe_critical_vertices} we can see that if $\mathcal{E}_{wb,H}$ occurs and $\mathbf{D}^H=\mathsf{d}$, then $H$ is not truncated if and only if $E(\mathbf{G})\cap E_1(H)\in\mathcal{S}_{\mathsf{d}}^H(E_1(H))$. Now using the fact that given $\mathbf{x}$, the random variables $(\mathbbm{1}_{uv\in\mathbf{G}})_{uv\in E_1(H)}$ are conditionally independent from the events $\mathcal{E}_{wb,H}$ and $\{\mathbf{D}^H=\mathsf{d}\}$, and the fact that given $\mathcal{E}_{wb,H}$, we have $\mathbf{Y}_{uv}=1-\frac{d}{n}$ for every $uv\in E_{\geq 2}(H)$, we deduce that:
\begin{equation}
\label{eq:eq_D_E_wb_DH_D_1}
\begin{aligned}
&\mathbb{E}\big[\overline{\mathbf{Y}}_H\big|\mathbf{x},\mathcal{E}_{wb,H},\mathbf{D}^H=\mathsf{d}\big]\\
&=\resizebox{0.9\textwidth}{!}{$\displaystyle\sum_{\mathsf{S}\in\mathcal{S}_{\mathsf{d}}^H(E_1(H))}\prod_{uv\in \mathsf{S}}\left[\left(1-\frac{d}{n}\right)\cdot\mathbb{P}[uv\in\mathbf{G}|\mathbf{x}]\right]\cdot\prod_{uv\in E_1(H)\setminus \mathsf{S}} \left[\left(-\frac{d}{n}\right)\cdot\mathbb{P}[uv\notin\mathbf{G}|\mathbf{x}]\right]\cdot\prod_{uv\in E_{\geq 2}(H)} \left(1-\frac{d}{n}\right)^{m_H(uv)}$}.
\end{aligned}
\end{equation}

Now for every $uv\in E_1(H)$, we have
\begin{equation}
\label{eq:eq_D_E_wb_DH_D_uv_in}
\begin{aligned}
\left(1-\frac{d}{n}\right)\cdot\mathbb{P}[uv\in\mathbf{G}|\mathbf{x}]&=\left(1-\frac{d}{n}\right)\cdot\left(1+\frac{\epsilon \mathbf{x}_u\mathbf{x}_v}{2}\right)\frac{d}{n}=\left(1+\frac{\epsilon \mathbf{x}_u\mathbf{x}_v}{2}\right)\frac{\tilde{d}}{n}.
\end{aligned}
\end{equation}
On the other hand,
\begin{equation}
\label{eq:eq_D_E_wb_DH_D_uv_out}
\begin{aligned}
\left(-\frac{d}{n}\right)\cdot\mathbb{P}[e\notin\mathbf{G}|\mathbf{x}]&=\left(-\frac{d}{n}\right)\cdot\left[1-\left(1+\frac{\epsilon \mathbf{x}_u\mathbf{x}_v}{2}\right)\frac{d}{n}\right]=-\frac{d}{n}\left(1-\frac{d}{n} -\frac{\epsilon \mathbf{x}_u\mathbf{x}_v d}{2n}\right)\\
&=-\frac{\tilde{d}}{n}+\frac{\epsilon \mathbf{x}_u\mathbf{x}_v d^2}{2n^2}=-\frac{\tilde{d}}{n}\left(1-\frac{\epsilon \mathbf{x}_u\mathbf{x}_v d^2}{2\tilde{d} n}\right)=-\frac{\tilde{d}}{n}\left(1-\frac{\tilde{\epsilon} \mathbf{x}_u\mathbf{x}_v }{n}\right).
\end{aligned}
\end{equation}

By combining \cref{eq:eq_Prob_Well_Behaved_Y_1}, \cref{eq:eq_D_E_wb_DH_D_1}, \cref{eq:eq_D_E_wb_DH_D_uv_in} and \cref{eq:eq_D_E_wb_DH_D_uv_in}, we get
\begin{align*}
\mathbb{E}&\big[\overline{\mathbf{Y}}_H\big|\mathbf{x},\mathcal{E}_{wb,H},\mathbf{D}^H=\mathsf{d}\big]\cdot P_{1,H}^{wb}(\mathbf{x})\\
&=\sum_{\mathsf{S}\in\mathcal{S}_{\mathsf{d}}^H(E_1(H))}\prod_{uv\in \mathsf{S}}\left[\left(1+\frac{\epsilon \mathbf{x}_u\mathbf{x}_v}{2}\right)\frac{\tilde{d}}{n}\right]\cdot\prod_{uv\in E_1(H)\setminus \mathsf{S}} \left[-\frac{\tilde{d}}{n}\left(1-\frac{\tilde{\epsilon} \mathbf{x}_u\mathbf{x}_v }{n}\right)\right]\\
&\quad\quad\quad\quad\quad\quad\quad\quad\quad\quad\quad\quad\times\prod_{uv\in E_{\geq 2}(H)} \left[\left(1-\frac{d}{n}\right)^{m_H(uv)}\cdot\left(1+\frac{\epsilon \mathbf{x}_u\mathbf{x}_v}{2}\right)\frac{d}{n}\right] \cdot \mathbbm{1}_{\mathcal{E}_{H,b}}(\mathbf{x}).
\end{align*}

Now for every $\mathsf{E}\subseteq E(H)$, define $\displaystyle m_H(\mathsf{E})=\sum_{uv\in \mathsf{E}}m_H(uv)$. We have:

\begin{align*}
\mathbb{E}&\big[\overline{\mathbf{Y}}_H\big|\mathbf{x},\mathcal{E}_{wb,H},\mathbf{D}^H=\mathsf{d}\big]\cdot P_{1,H}^{wb}(\mathbf{x})\\
&=\left(\frac{\tilde{d}}{n}\right)^{|E_1(H)|}\left(1-\frac{d}{n}\right)^{m_H(E_{\geq 2}(H))}\left(\frac{d}{n}\right)^{|E_{\geq 2}(H)|}\\
&\quad\quad\times\sum_{\mathsf{S}\in\mathcal{S}_{\mathsf{d}}^H(E_1(H))}\prod_{uv\in \mathsf{S}}\left(1+\frac{\epsilon \mathbf{x}_u\mathbf{x}_v}{2}\right)\cdot\prod_{uv\in E_1(H)\setminus \mathsf{S}} \left(-1+\frac{\tilde{\epsilon} \mathbf{x}_u\mathbf{x}_v }{n}\right)\cdot \prod_{uv\in E_{\geq 2}(H)} \left(1+\frac{\epsilon \mathbf{x}_u\mathbf{x}_v}{2}\right) \cdot \mathbbm{1}_{\mathcal{E}_{H,b}}(\mathbf{x})\\
&=\left(\frac{\tilde{d}}{n}\right)^{|E_1(H)|}\left(1-\frac{d}{n}\right)^{m_H(E_{\geq 2}(H))}\left(\frac{d}{n}\right)^{|E_{\geq 2}(H)|}\\
&\quad\quad\quad\quad\quad\times\sum_{\mathsf{S}\in\mathcal{S}_{\mathsf{d}}^H(E_1(H))}
\left[\sum_{\mathsf{S}'\subseteq \mathsf{S}}\prod_{uv\in \mathsf{S}'}\left(\frac{\epsilon \mathbf{x}_u\mathbf{x}_v}{2}\right)\right]
\cdot
\left[\sum_{\mathsf{S}''\subseteq E_1(H)\setminus \mathsf{S}} (-1)^{|E_1(H)|-|\mathsf{S}|-|\mathsf{S}''|}\prod_{uv\in \mathsf{S}''} \left(\frac{\tilde{\epsilon} \mathbf{x}_u\mathbf{x}_v }{n}\right)\right]\\
&\quad\quad\quad\quad\quad\quad\quad\quad\quad\quad\quad\quad\quad\quad\quad\quad\quad\quad\quad\quad\times
\left[\sum_{\mathsf{E}_{\geq 2}\subseteq E_{\geq 2}(H)} \prod_{uv\in \mathsf{E}_{\geq 2}} \left(\frac{\epsilon \mathbf{x}_u\mathbf{x}_v}{2}\right)\right]\cdot \mathbbm{1}_{\mathcal{E}_{H,b}}(\mathbf{x})\\
&=\left(\frac{\tilde{d}}{n}\right)^{|E_1(H)|}\left(1-\frac{d}{n}\right)^{m_H(E_{\geq 2}(H))}\left(\frac{d}{n}\right)^{|E_{\geq 2}(H)|}\\
&\quad\quad\quad\quad\times
\sum_{\mathsf{S}\in\mathcal{S}_{\mathsf{d}}^H(E_1(H))}\sum_{\mathsf{S}'\subseteq \mathsf{S}}\sum_{\mathsf{S}''\subseteq E_1(H)\setminus \mathsf{S}}\sum_{\mathsf{E}_{\geq 2}\subseteq E_{\geq 2}(H)} \left(\frac{\epsilon}{2}\right)^{|\mathsf{S}'|+|\mathsf{E}_{\geq 2}|}\cdot \left(\frac{\tilde{\epsilon}}{n}\right)^{|\mathsf{S}''|}\cdot (-1)^{|E_1(H)|-|\mathsf{S}|-|\mathsf{S}''|}\\
&\quad\quad\quad\quad\quad\quad\quad\quad\quad\quad\quad\quad\quad\quad\quad\quad \quad\quad\quad\quad\quad\quad\quad\quad
\times  \mathbbm{1}_{\mathcal{E}_{H,b}}(\mathbf{x})\cdot \prod_{uv\in \mathsf{S}'\cup \mathsf{S}''\cup \mathsf{E}_{\geq 2}} \mathbf{x}_u\mathbf{x}_v.
\end{align*}

Now since $\mathbbm{1}_{\mathcal{E}_{H,b}}(\mathbf{x})$ depends only on $(\mathbf{x}_u)_{u\in[n]\setminus V(H)}$, it is independent from $(\mathbf{x}_v)_{v\in V(H)}$. Therefore, for every $\mathsf{S}\in\mathcal{S}_{\mathsf{d}}^H(E_1(H))$, $\mathsf{S}'\subseteq \mathsf{S}$, $\mathsf{S}''\subseteq E_1(H)\setminus \mathsf{S}$ and $\mathsf{E}_{\geq 2}\subseteq E_{\geq 2}(H)$, we have:
\begin{align*}
\mathbb{E}\left[\mathbbm{1}_{\mathcal{E}_{H,b}}(\mathbf{x})\cdot \prod_{uv\in \mathsf{S}'\cup \mathsf{S}''\cup \mathsf{E}_{\geq 2}} \mathbf{x}_u\mathbf{x}_v\right]&=\mathbb{E}\left[\mathbbm{1}_{\mathcal{E}_{H,b}}(\mathbf{x})\right]\cdot \mathbb{E}\left[\prod_{uv\in \mathsf{S}'\cup \mathsf{S}''\cup \mathsf{E}_{\geq 2}} \mathbf{x}_u\mathbf{x}_v\right]\\
&=\mathbb{P}\left[\mathcal{E}_{H,b}\right]\cdot \mathbb{E}\left[\prod_{uv\in \mathsf{S}'\cup \mathsf{S}''\cup \mathsf{E}_{\geq 2}} \mathbf{x}_u\mathbf{x}_v\right].
\end{align*}

Notice that if $\mathsf{E}_{\geq 2}$ contains an edge outside $H^{(\ast)}$, then $\mathsf{S}'\cup \mathsf{S}''\cup \mathsf{E}_{\geq 2}$ cannot be a union of edge-disjoint cycles\footnote{Recall that $H^{(\ast)}=\displaystyle\bigcup_{i\in[r_H]} H^{(i)}$.} (see \Cref{def:def_pleasant_multigraphs}). \Cref{lem:lem_Expectation_Y_Nice} now implies that if $\mathsf{E}_{\geq 2}\not{\subseteq}E_{\geq2}(H^{(\ast)})$, then
$$\mathbb{E}\left[\prod_{uv\in \mathsf{S}'\cup \mathsf{S}''\cup \mathsf{E}_{\geq 2}} \mathbf{x}_u\mathbf{x}_v\right] =0.$$

Therefore,

\begin{equation}
\label{eq:eq_nice_expectation_d_P_1H}
\begin{aligned}
&\mathbb{E}\left[\mathbb{E}\big[\overline{\mathbf{Y}}_H\big|\mathbf{x},\mathcal{E}_{wb,H},\mathbf{D}^H=\mathsf{d}\big]\cdot P_{1,H}^{wb}(\mathbf{x})\right]\\
&=\left(\frac{\tilde{d}}{n}\right)^{|E_1(H)|}\left(1-\frac{d}{n}\right)^{m_H(E_{\geq 2}(H))}\left(\frac{d}{n}\right)^{|E_{\geq 2}(H)|}\cdot\mathbb{P}\left[\mathcal{E}_{H,b}\right]\\
&\quad\quad\quad\quad\times
\sum_{\mathsf{S}\in\mathcal{S}_{\mathsf{d}}^H(E_1(H))}\sum_{\mathsf{S}'\subseteq \mathsf{S}}\sum_{\mathsf{S}''\subseteq E_1(H)\setminus \mathsf{S}}\sum_{\mathsf{E}_{\geq 2}\subseteq E_{\geq 2}(H^{(\ast)})} \left(\frac{\epsilon}{2}\right)^{|\mathsf{S}'|+|\mathsf{E}_{\geq 2}|}\cdot \left(\frac{\tilde{\epsilon}}{n}\right)^{|\mathsf{S}''|}\cdot (-1)^{|E_1(H)|-|\mathsf{S}|-|\mathsf{S}''|}\\
&\quad\quad\quad\quad\quad\quad\quad\quad\quad\quad\quad\quad\quad\quad\quad\quad \quad\quad\quad\quad\quad\quad\quad\quad
\times  \mathbb{E}\left[\prod_{uv\in \mathsf{S}'\cup \mathsf{S}''\cup \mathsf{E}_{\geq 2}} \mathbf{x}_u\mathbf{x}_v\right].
\end{aligned}
\end{equation}

Now since $H$ is $(s,t)$-pleasant, the agreeable components $H^{(1)},\ldots,H^{(r_H)}$ are vertex disjoint. Combining this with \Cref{rem:rem_S_C_divide_on_Agreeables}, it is easy to see that we can rewrite \cref{eq:eq_nice_expectation_d_P_1H} as follows:

\begin{equation*}
\begin{aligned}
&\mathbb{E}\left[\mathbb{E}\big[\overline{\mathbf{Y}}_H\big|\mathbf{x},\mathcal{E}_{wb,H},\mathbf{D}^H=\mathsf{d}\big]\cdot P_{1,H}^{wb}(\mathbf{x})\right]\\
&=\left(\frac{\tilde{d}}{n}\right)^{|E_1(H)|}\left(1-\frac{d}{n}\right)^{m_H(E_{\geq 2}(H))}\left(\frac{d}{n}\right)^{|E_{\geq 2}(H)|}\cdot\mathbb{P}\left[\mathcal{E}_{H,b}\right]\\
&\quad\quad\quad\quad\times
\prod_{i\in[r_H]}\left(\sum_{\mathsf{S}_i\in\mathcal{S}_{\mathsf{d}}^H(E_1(H^{(i)}))} \sum_{\mathsf{S}_i'\subseteq \mathsf{S}_i}\sum_{\mathsf{S}_i''\subseteq E_1(H^{(i)})\setminus \mathsf{S}_i}\sum_{\mathsf{E}_{\geq 2}^{(i)}\subseteq E_{\geq 2}(H^{(i)})}\left(\frac{\epsilon}{2}\right)^{|\mathsf{S}_i'|+|\mathsf{E}_{\geq 2}^{(i)}|}\cdot \left(\frac{\tilde{\epsilon}}{n}\right)^{|\mathsf{S}_i''|}\right.\\
&\quad\quad\quad\quad\quad\quad\quad\quad\quad\quad\quad\quad\quad\quad\quad\quad\times
(-1)^{|E_1(H^{(i)})|-|\mathsf{S}_i|-|\mathsf{S}_i''|}\cdot\left.\mathbb{E}\left[\prod_{uv\in \mathsf{S}'_i\cup \mathsf{S}''_i\cup \mathsf{E}^{(i)}_{\geq 2}} \mathbf{x}_u\mathbf{x}_v\right]\right).
\end{aligned}
\end{equation*}
Therefore,
\begin{equation*}
\begin{aligned}
&\mathbb{E}\left[\mathbb{E}\big[\overline{\mathbf{Y}}_H\big|\mathbf{x},\mathcal{E}_{wb,H},\mathbf{D}^H=\mathsf{d}\big]\cdot P_{1,H}^{wb}(\mathbf{x})\right]\\
&\quad\quad\quad\quad\quad\quad\quad=\left(\frac{\tilde{d}}{n}\right)^{|E_1(H)|}\left(1-\frac{d}{n}\right)^{m_H(E_{\geq 2}(H))}\left(\frac{d}{n}\right)^{|E_{\geq 2}(H)|}\cdot\mathbb{P}\left[\mathcal{E}_{H,b}\right]\cdot\prod_{i\in[r_H]}J_{H,\mathsf{d},i}.
\end{aligned}
\end{equation*}
\end{proof}

We now turn to study the value of $J_{H,\mathsf{d},i}$. The following lemma will be useful.

\begin{lemma}
\label{lem:lem_sum_structurelly_safe_edge_safe_exists}
Let $\mathsf{E}\subseteq E_1(H)$ and let $\{\mathsf{E}',\mathsf{E}''\}$ be a partition of $\mathsf{E}$. Let $$f:\{\mathsf{S}'':\mathsf{S}''\subseteq\mathsf{E}''\}\rightarrow\mathbb{R}$$ be an arbitrary function on the subsets of $\mathsf{E}''$. If $\mathsf{E}'$ contains a $\mathsf{d}$-safe edge $uv$ then
\begin{align*}
\sum_{\mathsf{S}\in\mathcal{S}_{\mathsf{d}}^H(\mathsf{E})} (-1)^{|\mathsf{E}'|-|\mathsf{S}\cap \mathsf{E}'|}\cdot f(\mathsf{S}\cap \mathsf{E}'') =0.
\end{align*}
\end{lemma}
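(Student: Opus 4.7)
The plan is to prove the identity via a sign-reversing involution on $\mathcal{S}_{\mathsf{d}}^H(\mathsf{E})$ induced by toggling the specified safe edge $uv$.

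First I would observe that the $\mathsf{d}$-safety of $uv$ means that $d_1^H(u)\leq c_{\mathsf{d}}^H(u)$ and $d_1^H(v)\leq c_{\mathsf{d}}^H(v)$. Since $\mathsf{E}\subseteq E_1(H)$, for every $\mathsf{S}\subseteq \mathsf{E}$ we trivially have $d_{\mathsf{S}}(u)\leq d_1^H(u)\leq c_{\mathsf{d}}^H(u)$ and similarly at $v$. In other words, the structural safety constraints at the endpoints of $uv$ are satisfied by \emph{any} subset of $\mathsf{E}$, and in particular, toggling the presence of $uv$ in $\mathsf{S}$ does not affect the structural safety at $u$ or $v$. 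Since toggling $uv$ also does not change the degree $d_{\mathsf{S}}(w)$ for any $w\notin\{u,v\}$, we conclude that
\begin{equation*}
\mathsf{S}\in\mathcal{S}_{\mathsf{d}}^H(\mathsf{E})\iff \mathsf{S}\triangle\{uv\}\in\mathcal{S}_{\mathsf{d}}^H(\mathsf{E}),
\end{equation*}
where $\triangle$ denotes symmetric difference.

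Next I would use this to define the involution $\iota\colon \mathcal{S}_{\mathsf{d}}^H(\mathsf{E})\to\mathcal{S}_{\mathsf{d}}^H(\mathsf{E})$, $\iota(\mathsf{S})=\mathsf{S}\triangle\{uv\}$. It is clearly an involution with no fixed points (it always toggles the presence of $uv$). Now, because $uv\in \mathsf{E}'$, toggling $uv$ changes $|\mathsf{S}\cap \mathsf{E}'|$ by exactly $\pm 1$, so the sign $(-1)^{|\mathsf{E}'|-|\mathsf{S}\cap \mathsf{E}'|}$ flips under $\iota$. At the same time, $uv\notin \mathsf{E}''$, so $\mathsf{S}\cap \mathsf{E}''$ is preserved by $\iota$, and hence so is $f(\mathsf{S}\cap \mathsf{E}'')$. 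Pairing each $\mathsf{S}$ with $\iota(\mathsf{S})$ therefore yields cancellation of the two corresponding summands, and summing over all pairs gives zero. This concludes the proof.

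There is no real obstacle here: once one notices that $\mathsf{d}$-safety of $uv$ means the constraints at $u,v$ are automatically satisfied regardless of which subset one takes, the involution is immediate and the cancellation is trivial. The argument is essentially the standard sign-reversing involution technique, and no estimates or probabilistic computations are required.
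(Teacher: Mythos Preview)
Your proof is correct and is essentially the same as the paper's: the paper writes the decomposition $\mathcal{S}_{\mathsf{d}}^H(\mathsf{E})=\mathcal{S}_{\mathsf{d}}^H(\mathsf{E}\setminus\{uv\})\cup\big\{\mathsf{S}\cup\{uv\}:\mathsf{S}\in \mathcal{S}_{\mathsf{d}}^H(\mathsf{E}\setminus\{uv\})\big\}$ and pairs the two copies, which is exactly your toggling involution phrased differently. Your justification that $\mathsf{d}$-safety of $uv$ makes the constraints at $u,v$ automatically satisfied is the same observation the paper uses implicitly to assert this decomposition.
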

\begin{proof}
Since $uv\in\mathsf{E}$ is $\mathsf{d}$-safe, we can write
$$\mathcal{S}_{\mathsf{d}}^H(\mathsf{E})=\mathcal{S}_{\mathsf{d}}^H(\mathsf{E}\setminus\{uv\})\cup\big\{\mathsf{S}\cup\{uv\}:\;\mathsf{S}\in \mathcal{S}_{\mathsf{d}}^H(\mathsf{E}\setminus\{uv\})\big\}.$$
Therefore,
\begin{align*}
\sum_{\mathsf{S}\in\mathcal{S}_{\mathsf{d}}^H(\mathsf{E})}& (-1)^{|\mathsf{E}'|-|\mathsf{S}\cap \mathsf{E}'|}\cdot f(\mathsf{S}\cap \mathsf{E}'')\\
 &= \sum_{\mathsf{S}\in\mathcal{S}_{\mathsf{d}}^H(\mathsf{E}\setminus\{uv\})} \left[(-1)^{|\mathsf{E}'|-|\mathsf{S}\cap \mathsf{E}'|}\cdot f(\mathsf{S}\cap \mathsf{E}'')+ (-1)^{|\mathsf{E}'|-|(\mathsf{S}\cup\{uv\})\cap \mathsf{E}'|}\cdot f\big((\mathsf{S}\cup\{uv\})\cap \mathsf{E}''\big)\right]\\
 &= \sum_{\mathsf{S}\in\mathcal{S}_{\mathsf{d}}^H(\mathsf{E}\setminus\{uv\})} \left[(-1)^{|\mathsf{E}'|-|\mathsf{S}\cap \mathsf{E}'|}\cdot f(\mathsf{S}\cap \mathsf{E}'')+ (-1)^{|\mathsf{E}'|-|(\mathsf{S}\cap \mathsf{E}')\cup\{uv\}|}\cdot f\big(\mathsf{S}\cap \mathsf{E}''\big)\right]\\
&= \sum_{\mathsf{S}\in\mathcal{S}_{\mathsf{d}}^H(\mathsf{E}\setminus\{uv\})} (-1)^{|\mathsf{E}'|-|\mathsf{S}\cap \mathsf{E}'|}\left[1+ (-1)\right]\cdot f\big(\mathsf{S}\cap \mathsf{E}''\big)=0.
\end{align*}
\end{proof}

\begin{lemma}
\label{lem:lem_lemma_pleasant_each_agreeable_type_1}
Let $H$ be an $(s,t)$-pleasant multigraph. Let $H^{(i)}$ be a type-1 agreeable component of $H$ and let $J_{H,\mathsf{d},i}$ be as in \Cref{lem:lem_lemma_pleasant_decompose_on_agreeables}. We have:
\begin{itemize}
\item If $E_1(H^{(i)})$ contains a $\mathsf{d}$-safe edge, then\footnote{Recall that for a type-1 agreeable component $H^{(i)}$, we have $E_{\geq 2}'''(H^{(i)})=E_{\geq 2}(H^{(i)})$. See \Cref{def:def_E_geq2_triple_prime}.}
\begin{align*}
J_{H,\mathsf{d},i}=
\sum_{\mathsf{S}_i\in\mathcal{S}_{\mathsf{d}}^H(E_1(H^{(i)}))}
\left(\frac{\epsilon}{2}\right)^{|\mathsf{S}_i|+|E_{\geq 2}'''(H^{(i)})|}\cdot \left(\frac{\tilde{\epsilon}}{n}\right)^{|E_1(H^{(i)})| - |\mathsf{S}_i|}.
\end{align*}
\item If $E_1(H^{(i)})$ does not contain a $\mathsf{d}$-safe edge, then
\begin{align*}
|J_{H,\mathsf{d},i}|\leq 4\cdot 2^{|E_1(H^{(i)})|}.
\end{align*}
\end{itemize}
\end{lemma}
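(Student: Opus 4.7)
The plan is to exploit the fact that for a type-1 agreeable component, $G(H^{(i)})$ is a single cycle, and to apply \Cref{lem:lem_Expectation_Y_Nice} to see that very few of the quadruple-sum summands in $J_{H,\mathsf{d},i}$ survive after taking expectation.

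First I would observe that $\mathsf{S}'_i \subseteq \mathsf{S}_i \subseteq E_1(H^{(i)})$, $\mathsf{S}''_i \subseteq E_1(H^{(i)})\setminus \mathsf{S}_i$, and $\mathsf{E}^{(i)}_{\geq 2}\subseteq E_{\geq 2}(H^{(i)})$, so these three sets are pairwise disjoint and their union lies in $E(H^{(i)})$. Because $G(H^{(i)})$ is a single cycle (type-1 agreeable), the only edge subsets of $E(H^{(i)})$ that are an edge-disjoint union of cycles are $\varnothing$ and $E(H^{(i)})$ itself. By \Cref{lem:lem_Expectation_Y_Nice}, every other choice contributes $0$ to $J_{H,\mathsf{d},i}$.

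I would then enumerate the two surviving cases. In the empty case, $\mathsf{S}'_i=\mathsf{S}''_i=\mathsf{E}^{(i)}_{\geq 2}=\varnothing$ and the contribution is $(-1)^{|E_1(H^{(i)})|-|\mathsf{S}_i|}$. In the full case, disjointness forces $\mathsf{E}^{(i)}_{\geq 2}=E_{\geq 2}(H^{(i)})$, $\mathsf{S}'_i=\mathsf{S}_i$, and $\mathsf{S}''_i=E_1(H^{(i)})\setminus\mathsf{S}_i$, so the contribution collapses to $\left(\frac{\epsilon}{2}\right)^{|\mathsf{S}_i|+|E_{\geq 2}(H^{(i)})|}\left(\frac{\tilde\epsilon}{n}\right)^{|E_1(H^{(i)})|-|\mathsf{S}_i|}$ (the signs cancel). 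Thus
\begin{align*}
J_{H,\mathsf{d},i}=\sum_{\mathsf{S}_i\in\mathcal{S}_{\mathsf{d}}^H(E_1(H^{(i)}))}\left[(-1)^{|E_1(H^{(i)})|-|\mathsf{S}_i|}+\left(\tfrac{\epsilon}{2}\right)^{|\mathsf{S}_i|+|E_{\geq 2}(H^{(i)})|}\left(\tfrac{\tilde\epsilon}{n}\right)^{|E_1(H^{(i)})|-|\mathsf{S}_i|}\right].
\end{align*}

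For the first bullet, if $E_1(H^{(i)})$ contains a $\mathsf{d}$-safe edge, I would apply \Cref{lem:lem_sum_structurelly_safe_edge_safe_exists} with $\mathsf{E}=\mathsf{E}'=E_1(H^{(i)})$, $\mathsf{E}''=\varnothing$, and $f\equiv 1$ to kill the first summand, and then use $E_{\geq 2}'''(H^{(i)})=E_{\geq 2}(H^{(i)})$ (which holds for type-1 agreeable components by \Cref{def:def_E_geq2_triple_prime}) to match the claimed formula. For the second bullet, I would bound each bracketed term by $1+1=2$ in absolute value (using $\epsilon/2\le 1$ and $\tilde\epsilon/n\le 1$ for $n$ large enough) and estimate $|\mathcal{S}_{\mathsf{d}}^H(E_1(H^{(i)}))|\le 2^{|E_1(H^{(i)})|}$, giving $|J_{H,\mathsf{d},i}|\le 2\cdot 2^{|E_1(H^{(i)})|}\le 4\cdot 2^{|E_1(H^{(i)})|}$. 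No step is a substantial obstacle here; the only subtle point is to correctly track the disjointness of $\mathsf{S}'_i,\mathsf{S}''_i,\mathsf{E}^{(i)}_{\geq 2}$ so that the "full cycle" case pins down all three sets uniquely from $\mathsf{S}_i$.
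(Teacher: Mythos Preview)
Your proof is correct and follows essentially the same approach as the paper: reduce the quadruple sum via \Cref{lem:lem_Expectation_Y_Nice} to the two cases $\mathsf{S}'_i\cup\mathsf{S}''_i\cup\mathsf{E}^{(i)}_{\geq 2}\in\{\varnothing,E(H^{(i)})\}$, derive the same closed-form expression for $J_{H,\mathsf{d},i}$, then apply \Cref{lem:lem_sum_structurelly_safe_edge_safe_exists} for the first bullet and the crude $2\cdot|\mathcal{S}_{\mathsf{d}}^H(E_1(H^{(i)}))|$ bound for the second. Your explicit treatment of the disjointness of $\mathsf{S}'_i,\mathsf{S}''_i,\mathsf{E}^{(i)}_{\geq 2}$ to pin down the full-cycle case is exactly the point the paper handles by direct enumeration.
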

\begin{proof}
Since $H^{(i)}$ is a cycle, \Cref{lem:lem_Expectation_Y_Nice} implies that

$$\mathbb{E}\left[\prod_{uv\in \mathsf{S}'_i\cup \mathsf{S}''_i\cup \mathsf{E}^{(i)}_{\geq 2}} \mathbf{x}_u\mathbf{x}_v\right]\neq 0$$
if and only if $\mathsf{S}'_i\cup \mathsf{S}''_i\cup \mathsf{E}^{(i)}_{\geq 2}\in\big\{\varnothing, E\big(H^{(i)}\big)\big\}$, in which case we have
$$\mathbb{E}\left[\prod_{uv\in \mathsf{S}'_i\cup \mathsf{S}''_i\cup \mathsf{E}^{(i)}_{\geq 2}} \mathbf{x}_u\mathbf{x}_v\right]=1.$$

Notice the following:
\begin{itemize}
\item If $\mathsf{S}'_i\cup \mathsf{S}''_i\cup \mathsf{E}^{(i)}_{\geq 2}=\varnothing$, then $\mathsf{S}'_i= \mathsf{S}''_i= \mathsf{E}^{(i)}_{\geq 2}=\varnothing$, in which case we have
\begin{align*}
\big|E_1\big(H^{(i)}\big)\big|-|\mathsf{S}_i|-|\mathsf{S}_i''|=\big|E_1\big(H^{(i)}\big)\big|-|\mathsf{S}_i|,
\end{align*}
and
$$|\mathsf{S}_i'|+\big|\mathsf{E}^{(i)}_{\geq 2}\big|=|\mathsf{S}_i''|=0.$$
\item If $\mathsf{S}'_i\cup \mathsf{S}''_i\cup \mathsf{E}^{(i)}_{\geq 2}=E(H^{(i)})$, then $\mathsf{S}_i'=\mathsf{S}_i$, $\mathsf{S}_i''=E_1\big(H^{(i)}\big)\setminus \mathsf{S}_i$, and $\mathsf{E}^{(i)}_{\geq 2}=E_{\geq 2}(H^{(i)})$, in which case we have
\begin{align*}
\big|E_1\big(H^{(i)}\big)\big|-|\mathsf{S}_i|-|\mathsf{S}_i''|=0,
\end{align*}
$$|\mathsf{S}_i'|+\big|\mathsf{E}^{(i)}_{\geq 2}\big|=|\mathsf{S}_i|+ \big|E_{\geq 2}\big(H^{(i)}\big)\big|,$$
and
$$|\mathsf{S}_i''|=\big|E_1\big(H^{(i)}\big)\big|-|\mathsf{S}_i|.$$
\end{itemize}

Applying this to \cref{eq:eq_def_J_H_d_i}, we get

\begin{align*}
J_{H,\mathsf{d},i}=
\sum_{\mathsf{S}_i\in\mathcal{S}_{\mathsf{d}}^H(E_1(H^{(i)}))}
\left[(-1)^{|E_1(H^{(i)})|-|\mathsf{S}_i|}+
\left(\frac{\epsilon}{2}\right)^{|\mathsf{S}_i|+|E_{\geq 2}(H^{(i)})|}\cdot \left(\frac{\tilde{\epsilon}}{n}\right)^{|E_1(H^{(i)})| - |\mathsf{S}_i|}\right].
\end{align*}

It follows from \Cref{lem:lem_sum_structurelly_safe_edge_safe_exists} that if there exists a $\mathsf{d}$-safe edge in $E_1\big(H^{(i)}\big)$, then

\begin{align*}
J_{H,\mathsf{d},i}=
\sum_{\mathsf{S}_i\in\mathcal{S}_{\mathsf{d}}^H(E_1(H^{(i)}))}
\left(\frac{\epsilon}{2}\right)^{|\mathsf{S}_i|+|E_{\geq 2}(H^{(i)})|}\cdot \left(\frac{\tilde{\epsilon}}{n}\right)^{|E_1(H^{(i)})| - |\mathsf{S}_i|}.
\end{align*}

On the other hand, if $E_1\big(H^{(i)}\big)$ does not contain any $\mathsf{d}$-safe edge, then
\begin{align*}
|J_{H,\mathsf{d},i}|\leq 
\sum_{\mathsf{S}_i\in\mathcal{S}_{\mathsf{d}}^H(E_1(H^{(i)}))}
\left(1+1\right)= 2\cdot|\mathcal{S}_{\mathsf{d}}^H(E_1(H^{(i)}))|\leq 2\cdot 2^{|E_1(H^{(i)})|}\leq 4\cdot 2^{|E_1(H^{(i)})|}.
\end{align*}
\end{proof}

\begin{lemma}
\label{lem:lem_lemma_pleasant_each_agreeable_type_2}
Let $H$ be an $(s,t)$-pleasant multigraph. Let $H^{(i)}$ be a type-2 agreeable component of $H$ and let $J_{H,\mathsf{d},i}$ be as in \Cref{lem:lem_lemma_pleasant_decompose_on_agreeables}. We have:
\begin{itemize}
\item If each cycle\footnote{There are two cycles in $E(H^{(i)})$.} of $E(H^{(i)})$ contains a $\mathsf{d}$-safe edge of multiplicity 1, then\footnote{Recall that for a type-2 agreeable component $H^{(i)}$, we have $E_{\geq 2}'''(H^{(i)})=E_{\geq 2}(H^{(i)})$. See \Cref{def:def_E_geq2_triple_prime}.}
\begin{align*}
J_{H,\mathsf{d},i}=
\sum_{\mathsf{S}_i\in\mathcal{S}_{\mathsf{d}}^H(E_1(H^{(i)}))}
\left(\frac{\epsilon}{2}\right)^{|\mathsf{S}_i|+|E_{\geq 2}'''(H^{(i)})|}\cdot \left(\frac{\tilde{\epsilon}}{n}\right)^{|E_1(H^{(i)})| - |\mathsf{S}_i|}.
\end{align*}
\item If there exists at least one cycle of $E(H^{(i)})$ that does not contain any $\mathsf{d}$-safe edge of multiplicity 1, then
\begin{align*}
|J_{H,\mathsf{d},i}|\leq 4\cdot 2^{|E_1(H^{(i)})|}.
\end{align*}
\end{itemize}
\end{lemma}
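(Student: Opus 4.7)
The plan is to mirror the proof of Lemma~\ref{lem:lem_lemma_pleasant_each_agreeable_type_1}, accounting for the fact that a type-$2$ agreeable component consists of two edge-disjoint simple cycles $E'(H^{(i)})$ and $E''(H^{(i)})$ meeting only at the vertex $u_H$. Every vertex of $V(E(H^{(i)}))\setminus\{u_H\}$ has degree~$2$ in $E(H^{(i)})$, while $u_H$ has degree~$4$, so the only subsets of $E(H^{(i)})$ in which every vertex has even degree are $\varnothing$, $E'(H^{(i)})$, $E''(H^{(i)})$, and $E(H^{(i)})$. By \cref{lem:lem_Expectation_Y_Nice}, the contribution of a quadruple $(\mathsf{S}_i,\mathsf{S}_i',\mathsf{S}_i'',\mathsf{E}_{\geq 2}^{(i)})$ to $J_{H,\mathsf{d},i}$ vanishes unless $\mathsf{S}_i'\cup \mathsf{S}_i''\cup \mathsf{E}_{\geq 2}^{(i)}$ equals one of these four sets, in which case the $\mathbf{x}$-expectation equals $1$.

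For each $\mathsf{U}\in\{\varnothing,E'(H^{(i)}),E''(H^{(i)}),E(H^{(i)})\}$, write $E_1(\mathsf{U})=\mathsf{U}\cap E_1(H^{(i)})$ and $E_{\geq 2}(\mathsf{U})=\mathsf{U}\cap E_{\geq 2}(H^{(i)})$. Since $\mathsf{E}_{\geq 2}^{(i)}\subseteq E_{\geq 2}(H^{(i)})$ and $\mathsf{S}_i',\mathsf{S}_i''\subseteq E_1(H^{(i)})$, the constraints $\mathsf{S}_i'\subseteq\mathsf{S}_i$, $\mathsf{S}_i''\cap\mathsf{S}_i=\varnothing$, and $\mathsf{S}_i'\cup\mathsf{S}_i''\cup\mathsf{E}_{\geq 2}^{(i)}=\mathsf{U}$ force
\[
\mathsf{E}_{\geq 2}^{(i)}=E_{\geq 2}(\mathsf{U}),\qquad \mathsf{S}_i'=\mathsf{S}_i\cap E_1(\mathsf{U}),\qquad \mathsf{S}_i''=E_1(\mathsf{U})\setminus\mathsf{S}_i.
\]
Setting $F_\mathsf{U}=E_1(H^{(i)})\setminus E_1(\mathsf{U})$, a short calculation yields $|E_1(H^{(i)})|-|\mathsf{S}_i|-|\mathsf{S}_i''|=|F_\mathsf{U}|-|\mathsf{S}_i\cap F_\mathsf{U}|$, so $J_{H,\mathsf{d},i}=\sum_\mathsf{U} T_\mathsf{U}$ with
\[
T_\mathsf{U}=\sum_{\mathsf{S}_i\in\mathcal{S}_{\mathsf{d}}^H(E_1(H^{(i)}))}(-1)^{|F_\mathsf{U}|-|\mathsf{S}_i\cap F_\mathsf{U}|}\cdot g_\mathsf{U}\bigl(\mathsf{S}_i\cap E_1(\mathsf{U})\bigr),
\]
where $g_\mathsf{U}(\mathsf{S})=(\epsilon/2)^{|\mathsf{S}|+|E_{\geq 2}(\mathsf{U})|}(\tilde\epsilon/n)^{|E_1(\mathsf{U})|-|\mathsf{S}|}$. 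This is exactly the form to which \cref{lem:lem_sum_structurelly_safe_edge_safe_exists} applies with partition $(\mathsf{E}',\mathsf{E}'')=(F_\mathsf{U},E_1(\mathsf{U}))$: whenever $F_\mathsf{U}$ contains a $\mathsf{d}$-safe edge, $T_\mathsf{U}=0$.

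Under the first-case hypothesis, each of $E_1(E'(H^{(i)}))$ and $E_1(E''(H^{(i)}))$ contains a $\mathsf{d}$-safe edge, so $F_\varnothing=E_1(H^{(i)})$, $F_{E'(H^{(i)})}=E_1(E''(H^{(i)}))$, and $F_{E''(H^{(i)})}=E_1(E'(H^{(i)}))$ all contain one; the corresponding three terms vanish. Only $T_{E(H^{(i)})}$ survives: here $F_{E(H^{(i)})}=\varnothing$, $E_{\geq 2}(E(H^{(i)}))=E_{\geq 2}(H^{(i)})=E_{\geq 2}'''(H^{(i)})$ by \cref{def:def_E_geq2_triple_prime}, and $\mathsf{S}_i\cap E_1(E(H^{(i)}))=\mathsf{S}_i$, which reproduces the displayed closed form exactly. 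For the second case, I use $\epsilon/2\le 1$ and $\tilde\epsilon/n\le 1$ for $n$ large enough so that each summand of $T_\mathsf{U}$ has absolute value at most $1$; since $|\mathcal{S}_{\mathsf{d}}^H(E_1(H^{(i)}))|\le 2^{|E_1(H^{(i)})|}$, this yields $|T_\mathsf{U}|\le 2^{|E_1(H^{(i)})|}$, and summing over the four choices of $\mathsf{U}$ gives the claimed bound $|J_{H,\mathsf{d},i}|\le 4\cdot 2^{|E_1(H^{(i)})|}$. The only delicate point is the sign identification $(-1)^{|E_1(H^{(i)})|-|\mathsf{S}_i|-|\mathsf{S}_i''|}=(-1)^{|F_\mathsf{U}|-|\mathsf{S}_i\cap F_\mathsf{U}|}$, which rewrites the ambient sign in $J_{H,\mathsf{d},i}$ into the shape required by \cref{lem:lem_sum_structurelly_safe_edge_safe_exists}; once this bookkeeping is carried out, everything else is routine.
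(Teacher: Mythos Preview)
Your proof is correct and follows essentially the same approach as the paper: identify the four edge-disjoint cycle unions $\varnothing,E'(H^{(i)}),E''(H^{(i)}),E(H^{(i)})$ via \cref{lem:lem_Expectation_Y_Nice}, compute the resulting exponents, and apply \cref{lem:lem_sum_structurelly_safe_edge_safe_exists} to kill three of the four terms in the first case. The only difference is cosmetic: you package the four cases uniformly through the $\mathsf{U},F_\mathsf{U}$ notation, whereas the paper writes out each case explicitly and then collects the four-term expression before invoking the cancellation lemma.
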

\begin{proof}
Let $E'\big(H^{(i)}\big)$ and $E''\big(H^{(i)}\big)$ be the two cycles of $H^{(i)}$ as in definition \Cref{def:def_agreeable_multigraphs}. Define
$$E'_1\big(H^{(i)}\big)=E_1\big(H^{(i)}\big)\cap E'\big(H^{(i)}\big),$$
$$E''_1\big(H^{(i)}\big)=E_1\big(H^{(i)}\big)\cap E''\big(H^{(i)}\big),$$
$$E'_{\geq2}\big(H^{(i)}\big)=E_{\geq2}\big(H^{(i)}\big)\cap E'\big(H^{(i)}\big) ,$$
and
$$E''_{\geq2}\big(H^{(i)}\big)=E_{\geq2}\big(H^{(i)}\big)\cap E''\big(H^{(i)}\big).$$

Since $E'(H^{(i)})$ and $E''(H^{(i)})$ are two edge-disjoint cycles that intersect only in one vertex, it follows from \Cref{lem:lem_Expectation_Y_Nice} that

$$\mathbb{E}\left[\prod_{uv\in \mathsf{S}'_i\cup \mathsf{S}''_i\cup \mathsf{E}^{(i)}_{\geq 2}} \mathbf{x}_u\mathbf{x}_v\right]\neq 0$$
if and only if $\mathsf{S}'_i\cup \mathsf{S}''_i\cup \mathsf{E}^{(i)}_{\geq 2}\in\big\{\varnothing, E'(H^{(i)}), E''(H^{(i)}), E(H^{(i)})\big\}$, in which case we have
$$\mathbb{E}\left[\prod_{uv\in \mathsf{S}'_i\cup \mathsf{S}''_i\cup \mathsf{E}^{(i)}_{\geq 2}} \mathbf{x}_u\mathbf{x}_v\right]=1.$$

Notice the following:
\begin{itemize}
\item If $\mathsf{S}'_i\cup \mathsf{S}''_i\cup \mathsf{E}^{(i)}_{\geq 2}=\varnothing$, then $\mathsf{S}'_i= \mathsf{S}''_i= \mathsf{E}^{(i)}_{\geq 2}=\varnothing$, in which case we have
\begin{align*}
\big|E_1\big(H^{(i)}\big)\big|-|\mathsf{S}_i|-|\mathsf{S}_i''|&=\big|E_1\big(H^{(i)}\big)\big|-|\mathsf{S}_i|\\
&=\big|E_1'\big(H^{(i)}\big)\big|-\big|\mathsf{S}_i\cap E_1'\big(H^{(i)}\big)\big|+\big|E_1''\big(H^{(i)}\big)\big|-\big|\mathsf{S}_i\cap E_1''\big(H^{(i)}\big)\big|,
\end{align*}
and
$$|\mathsf{S}_i'|+\big|\mathsf{E}^{(i)}_{\geq 2}\big|=|\mathsf{S}_i''|=0.$$
\item If $\mathsf{S}'_i\cup \mathsf{S}''_i\cup \mathsf{E}^{(i)}_{\geq 2}=E'\big(H^{(i)}\big)$, then $\mathsf{S}_i'=\mathsf{S}_i\cap E'_1\big(H^{(i)}\big)$, $\mathsf{S}_i''=E'_1\big(H^{(i)}\big)\setminus \mathsf{S}_i$ and $\mathsf{E}^{(i)}_{\geq 2}=E_{\geq 2}'\big(H^{(i)}\big)$, in which case we have
\begin{align*}
\big|E_1\big(H^{(i)}\big)&\big|-|\mathsf{S}_i|-|\mathsf{S}_i''|\\
&=\big|E_1'\big(H^{(i)}\big)\big|+\big|E_1''\big(H^{(i)}\big)\big|-\big|\mathsf{S}_i\cap E_1'\big(H^{(i)}\big)\big|-\big|\mathsf{S}_i\cap E_1''\big(H^{(i)}\big)\big|-\big|E'_1\big(H^{(i)}\big)\setminus \mathsf{S}_i\big|\\
&=\big|E_1''\big(H^{(i)}\big)\big|-\big|\mathsf{S}_i\cap E_1''\big(H^{(i)}\big)\big|,
\end{align*}
$$|\mathsf{S}_i'|+\big|\mathsf{E}^{(i)}_{\geq 2}\big|=\big|\mathsf{S}_i\cap E'_1\big(H^{(i)}\big)\big|+ \big|E_{\geq 2}'\big(H^{(i)}\big)\big|,$$
and
$$|\mathsf{S}_i''|=\big|E'_1\big(H^{(i)}\big)\setminus \mathsf{S}_i\big|=\big|E'_1\big(H^{(i)}\big)\big|-\big|\mathsf{S}_i\cap E'_1\big(H^{(i)}\big)\big|.$$
\item If $\mathsf{S}'_i\cup \mathsf{S}''_i\cup \mathsf{E}^{(i)}_{\geq 2}=E''\big(H^{(i)}\big)$, then $\mathsf{S}_i'=\mathsf{S}_i\cap E''_1\big(H^{(i)}\big)$, $\mathsf{S}_i''=E''_1\big(H^{(i)}\big)\setminus \mathsf{S}_i$ and $\mathsf{E}^{(i)}_{\geq 2}=E_{\geq 2}''\big(H^{(i)}\big)$, in which case we have
\begin{align*}
\big|E_1\big(H^{(i)}\big)&\big|-|\mathsf{S}_i|-|\mathsf{S}_i''|\\
&=\big|E_1'\big(H^{(i)}\big)\big|+\big|E_1''\big(H^{(i)}\big)\big|-\big|\mathsf{S}_i\cap E_1'\big(H^{(i)}\big)\big|-\big|\mathsf{S}_i\cap E_1''\big(H^{(i)}\big)\big|-\big|E''_1\big(H^{(i)}\big)\setminus \mathsf{S}_i\big|\\
&=\big|E_1'\big(H^{(i)}\big)\big|-\big|\mathsf{S}_i\cap E_1'\big(H^{(i)}\big)\big|,
\end{align*}
$$|\mathsf{S}_i'|+\big|\mathsf{E}^{(i)}_{\geq 2}\big|=\big|\mathsf{S}_i\cap E''_1\big(H^{(i)}\big)\big|+ \big|E_{\geq 2}''\big(H^{(i)}\big)\big|,$$
and
$$|\mathsf{S}_i''|=\big|E''_1\big(H^{(i)}\big)\setminus \mathsf{S}_i\big|=\big|E''_1\big(H^{(i)}\big)\big|-\big|\mathsf{S}_i\cap E''_1\big(H^{(i)}\big) \big|.$$
\item If $\mathsf{S}'_i\cup \mathsf{S}''_i\cup \mathsf{E}^{(i)}_{\geq 2}=E\big(H^{(i)}\big)$, then $\mathsf{S}_i'=\mathsf{S}_i$, $\mathsf{S}_i''=E_1\big(H^{(i)}\big)\setminus \mathsf{S}_i$ and $\mathsf{E}^{(i)}_{\geq 2}=E_{\geq 2}(H^{(i)})$, in which case we have
\begin{align*}
\big|E_1\big(H^{(i)}\big)\big|-|\mathsf{S}_i|-|\mathsf{S}_i''|=0,
\end{align*}
\begin{align*}
|\mathsf{S}_i'|+\big|\mathsf{E}^{(i)}_{\geq 2}\big|&=|\mathsf{S}_i|+ \big|E_{\geq 2}\big(H^{(i)}\big)\big|,
\end{align*}
and
\begin{align*}
|\mathsf{S}_i''|&=\big|E_1\big(H^{(i)}\big)\big|-|\mathsf{S}_i|.
\end{align*}
\end{itemize}

Applying this to \cref{eq:eq_def_J_H_d_i}, we get

\begin{align*}
J_{H,\mathsf{d},i}&=
\sum_{\mathsf{S}_i\in\mathcal{S}_{\mathsf{d}}^H(E_1(H^{(i)}))}
\Bigg[(-1)^{|E_1'(H^{(i)})|-|\mathsf{S}_i\cap E_1'(H^{(i)})|}\cdot(-1)^{|E_1''(H^{(i)})|-|\mathsf{S}_i\cap E_1''(H^{(i)})|}\\
&\quad\quad\quad\quad+(-1)^{|E_1''(H^{(i)})|-|\mathsf{S}_i\cap E_1''(H^{(i)})|}\cdot \left(\frac{\epsilon}{2}\right)^{|\mathsf{S}_i\cap E_1'(H^{(i)})|+|E_{\geq 2}'(H^{(i)})|}\cdot \left(\frac{\tilde{\epsilon}}{n}\right)^{|E_1'(H^{(i)})| - |\mathsf{S}_i\cap E_1'(H^{(i)})|}\\
&\quad\quad\quad\quad+ (-1)^{|E_1'(H^{(i)})|-|\mathsf{S}_i\cap E_1'(H^{(i)})|}\cdot \left(\frac{\epsilon}{2}\right)^{|\mathsf{S}_i\cap E_1''(H^{(i)})|+|E_{\geq 2}''(H^{(i)})|}\cdot \left(\frac{\tilde{\epsilon}}{n}\right)^{|E_1''(H^{(i)})| - |\mathsf{S}_i\cap E_1''(H^{(i)})|}\\
&\quad\quad\quad\quad\quad\quad\quad\quad\quad\quad\quad\quad\quad\quad\quad\quad\quad\quad \quad\quad+\left(\frac{\epsilon}{2}\right)^{|\mathsf{S}_i|+|E_{\geq 2}(H^{(i)})|}\cdot \left(\frac{\tilde{\epsilon}}{n}\right)^{|E_1(H^{(i)})| - |\mathsf{S}_i|}\Bigg].
\end{align*}

It follows from \Cref{lem:lem_sum_structurelly_safe_edge_safe_exists} that if there exists a $\mathsf{d}$-safe edge in $E_1'\big(H^{(i)}\big)$ and a $\mathsf{d}$-safe edge in $E_1''\big(H^{(i)}\big)$, then

\begin{align*}
J_{H,\mathsf{d},i}=
\sum_{\mathsf{S}_i\in\mathcal{S}_{\mathsf{d}}^H(E_1(H^{(i)}))}
\left(\frac{\epsilon}{2}\right)^{|\mathsf{S}_i|+|E_{\geq 2}(H^{(i)})|}\cdot \left(\frac{\tilde{\epsilon}}{n}\right)^{|E_1(H^{(i)})| - |\mathsf{S}_i|}.
\end{align*}

On the other hand, if $E_1'\big(H^{(i)}\big)$ does not contain any $\mathsf{d}$-safe edge or $E_1''\big(H^{(i)}\big)$ does not contain any $\mathsf{d}$-safe edge, then
\begin{align*}
|J_{H,\mathsf{d},i}|\leq 
\sum_{\mathsf{S}_i\in\mathcal{S}_{\mathsf{d}}^H(E_1(H^{(i)}))}
\left(1+1+1+1\right)= 4\cdot|\mathcal{S}_{\mathsf{d}}^H(E_1(H^{(i)}))|\leq 4\cdot 2^{|E_1(H^{(i)})|}.
\end{align*}
\end{proof}

\begin{lemma}
\label{lem:lem_lemma_pleasant_each_agreeable_type_3}
Let $H$ be an $(s,t)$-pleasant multigraph. Let $H^{(i)}$ be a type-3 agreeable component of $H$ and let $J_{H,\mathsf{d},i}$ be as in \Cref{lem:lem_lemma_pleasant_decompose_on_agreeables}. Let $E'\big(H^{(i)}\big)$ and $E''\big(H^{(i)}\big)$ be two cycles of $H^{(i)}$ as in definition \Cref{def:def_agreeable_multigraphs}, i.e., $E'\big(H^{(i)}\big)\cap E''\big(H^{(i)}\big)$ is a simple path of edges of multiplicity at least 2. Define
$$E'''\big(H^{(i)}\big)=\Big(E'\big(H^{(i)}\big)\cup E''\big(H^{(i)}\big)\Big)\setminus \Big(E'\big(H^{(i)}\big)\cap E''\big(H^{(i)}\big)\Big),$$
$$E'''_{\geq2}\big(H^{(i)}\big)=E_{\geq2}\big(H^{(i)}\big)\cap E'''\big(H^{(i)}\big)=E_{\geq2}\big(H^{(i)}\big)\setminus \Big(E'\big(H^{(i)}\big)\cap E''\big(H^{(i)}\big)\Big),$$
$$E'_1\big(H^{(i)}\big)=E_1\big(H^{(i)}\big)\cap E'\big(H^{(i)}\big),$$
and
$$E''_1\big(H^{(i)}\big)=E_1\big(H^{(i)}\big)\cap E''\big(H^{(i)}\big).$$
We have:
\begin{itemize}
\item If there exists a $\mathsf{d}$-safe edge in $E_1'\big(H^{(i)}\big)$ and a $\mathsf{d}$-safe edge in $E_1''\big(H^{(i)}\big)$, then
\begin{align*}
J_{H,\mathsf{d},i}=
\sum_{\mathsf{S}_i\in\mathcal{S}_{\mathsf{d}}^H(E_1(H^{(i)}))}
\left(\frac{\epsilon}{2}\right)^{|\mathsf{S}_i|+|E_{\geq 2}'''(H^{(i)})|}\cdot \left(\frac{\tilde{\epsilon}}{n}\right)^{|E_1(H^{(i)})| - |\mathsf{S}_i|}.
\end{align*}
\item If $E_1'\big(H^{(i)}\big)$ does not contain any $\mathsf{d}$-safe edge or $E_1''\big(H^{(i)}\big)$ does not contain any $\mathsf{d}$-safe edge, then
\begin{align*}
|J_{H,\mathsf{d},i}|\leq 4\cdot 2^{|E_1(H^{(i)})|}.
\end{align*}
\end{itemize}
\end{lemma}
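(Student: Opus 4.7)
The proof follows the template established in the type-1 and type-2 cases. The first step is to identify, via Lemma \ref{lem:lem_Expectation_Y_Nice}, exactly which subsets $\mathsf{E} = \mathsf{S}'_i \cup \mathsf{S}''_i \cup \mathsf{E}^{(i)}_{\geq 2} \subseteq E(H^{(i)})$ produce a nonzero contribution, i.e.\ which ones give every vertex an even degree. Writing $P = E'(H^{(i)}) \cap E''(H^{(i)})$ for the shared simple path (whose edges all lie in $E_{\geq 2}$) and tracing the parity condition vertex by vertex --- interior vertices of $P$ force $P$ to be included as a whole or not at all; vertices interior to $E' \setminus P$ force $E' \setminus P$ to behave similarly, and likewise for $E'' \setminus P$; the two junction vertices then single out the admissible combinations --- one finds exactly four subsets that survive, namely $\varnothing$, $E'(H^{(i)})$, $E''(H^{(i)})$, and $E'''(H^{(i)}) = (E'(H^{(i)}) \setminus P) \sqcup (E''(H^{(i)}) \setminus P)$. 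Note that $E_1(H^{(i)}) = E'_1(H^{(i)}) \sqcup E''_1(H^{(i)})$ since $E_1 \cap P = \varnothing$.

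Next I would substitute each of the four admissible subsets into the defining sum (\ref{eq:eq_def_J_H_d_i}). Because the multiplicity-$1$ edges and the multiplicity-$\geq\!2$ edges are disjoint, within each case the decomposition $\mathsf{S}'_i \cup \mathsf{S}''_i = \mathsf{E} \cap E_1(H^{(i)})$ and $\mathsf{E}^{(i)}_{\geq 2} = \mathsf{E} \cap E_{\geq 2}(H^{(i)})$ is forced. After a short calculation with the resulting exponents, $J_{H,\mathsf{d},i}$ expands as
\begin{align*}
J_{H,\mathsf{d},i} = \sum_{\mathsf{S}_i \in \mathcal{S}_{\mathsf{d}}^H(E_1(H^{(i)}))} \Biggl[\, & (-1)^{|E_1'(H^{(i)})| - |\mathsf{S}_i \cap E_1'(H^{(i)})|}\cdot (-1)^{|E_1''(H^{(i)})| - |\mathsf{S}_i \cap E_1''(H^{(i)})|} \\
& + (-1)^{|E_1''(H^{(i)})| - |\mathsf{S}_i \cap E_1''(H^{(i)})|} \bigl(\tfrac{\epsilon}{2}\bigr)^{|\mathsf{S}_i \cap E_1'(H^{(i)})| + |E_{\geq 2}'(H^{(i)})|} \bigl(\tfrac{\tilde\epsilon}{n}\bigr)^{|E_1'(H^{(i)})| - |\mathsf{S}_i \cap E_1'(H^{(i)})|} \\
& + (-1)^{|E_1'(H^{(i)})| - |\mathsf{S}_i \cap E_1'(H^{(i)})|} \bigl(\tfrac{\epsilon}{2}\bigr)^{|\mathsf{S}_i \cap E_1''(H^{(i)})| + |E_{\geq 2}''(H^{(i)})|} \bigl(\tfrac{\tilde\epsilon}{n}\bigr)^{|E_1''(H^{(i)})| - |\mathsf{S}_i \cap E_1''(H^{(i)})|} \\
& + \bigl(\tfrac{\epsilon}{2}\bigr)^{|\mathsf{S}_i| + |E_{\geq 2}'''(H^{(i)})|} \bigl(\tfrac{\tilde\epsilon}{n}\bigr)^{|E_1(H^{(i)})| - |\mathsf{S}_i|} \,\Biggr],
\end{align*}
which is structurally the same expression that arises in the type-2 case, except that the ``full'' contribution is indexed by $E'''(H^{(i)})$ instead of by $E(H^{(i)})$ itself --- the shared path $P$ has been subtracted out, and that is precisely why $E_{\geq 2}'''(H^{(i)})$ (rather than $E_{\geq 2}(H^{(i)})$) appears in the surviving term.

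The third step is the cancellation argument. If $E_1'(H^{(i)})$ and $E_1''(H^{(i)})$ each contain a $\mathsf{d}$-safe edge, then for each of the first three summands above I partition $E_1(H^{(i)}) = \mathsf{E}' \sqcup \mathsf{E}''$ so that $\mathsf{E}'$ harbors a $\mathsf{d}$-safe edge and the summand has the form $(-1)^{|\mathsf{E}'| - |\mathsf{S}_i \cap \mathsf{E}'|} f(\mathsf{S}_i \cap \mathsf{E}'')$ required by Lemma \ref{lem:lem_sum_structurelly_safe_edge_safe_exists}; concretely, for the $\varnothing$ term take $\mathsf{E}' = E_1'(H^{(i)})$, $\mathsf{E}'' = E_1''(H^{(i)})$, for the $E'$ term take $\mathsf{E}' = E_1''(H^{(i)})$, $\mathsf{E}'' = E_1'(H^{(i)})$, and symmetrically for the $E''$ term. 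Each application gives zero, leaving only the $E'''$ contribution, which is the desired closed-form expression. If instead one of $E_1'(H^{(i)}),\,E_1''(H^{(i)})$ contains no $\mathsf{d}$-safe edge, I bound each of the four absolute contributions by $1$ and obtain $|J_{H,\mathsf{d},i}| \le 4\cdot|\mathcal{S}_{\mathsf{d}}^H(E_1(H^{(i)}))| \le 4 \cdot 2^{|E_1(H^{(i)})|}$.

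The main obstacle is the bookkeeping in the second step: one must confirm that the case $\mathsf{E} = E'''(H^{(i)})$ really does produce exponents $|\mathsf{S}_i| + |E_{\geq 2}'''(H^{(i)})|$ and $|E_1(H^{(i)})| - |\mathsf{S}_i|$ (rather than something involving $|E_{\geq 2}(H^{(i)})|$), and to then choose the three pairings $\{\mathsf{E}', \mathsf{E}''\}$ for Lemma \ref{lem:lem_sum_structurelly_safe_edge_safe_exists} so that the $\mathsf{d}$-safe edge always lies in $\mathsf{E}'$ and the remaining factor genuinely depends only on $\mathsf{S}_i \cap \mathsf{E}''$. Once these two points are handled, the argument reduces cleanly to the one already carried out for the type-2 case.
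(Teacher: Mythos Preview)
Your proposal is correct and follows essentially the same approach as the paper's proof: identify the four even-degree subsets $\varnothing, E'(H^{(i)}), E''(H^{(i)}), E'''(H^{(i)})$ via \Cref{lem:lem_Expectation_Y_Nice}, expand $J_{H,\mathsf{d},i}$ into the same four-term sum, apply \Cref{lem:lem_sum_structurelly_safe_edge_safe_exists} with the partitions you name to kill the first three terms when both $E_1'$ and $E_1''$ contain a $\mathsf{d}$-safe edge, and bound trivially otherwise. The only cosmetic difference is that the paper states the four admissible subsets directly as ``the only edge-disjoint unions of cycles'' rather than tracing parities vertex by vertex.
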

\begin{proof}
Define
$$E'_{\geq2}\big(H^{(i)}\big)=E_{\geq2}\big(H^{(i)}\big)\cap E'\big(H^{(i)}\big),$$
and
$$E''_{\geq2}\big(H^{(i)}\big)=E_{\geq2}\big(H^{(i)}\big)\cap E''\big(H^{(i)}\big).$$

The only subsets of $E\big(H^{(i)}\big)$ consisting of edge-disjoint unions of cycles are $\varnothing$, $E'\big(H^{(i)}\big)$, $E''\big(H^{(i)}\big)$ and $E'''\big(H^{(i)}\big)$. It follows from \Cref{lem:lem_Expectation_Y_Nice} that

$$\mathbb{E}\left[\prod_{uv\in \mathsf{S}'_i\cup \mathsf{S}''_i\cup \mathsf{E}^{(i)}_{\geq 2}} \mathbf{x}_u\mathbf{x}_v\right]\neq 0$$
if and only if $\mathsf{S}'_i\cup \mathsf{S}''_i\cup \mathsf{E}^{(i)}_{\geq 2}\in\big\{\varnothing, E'(H^{(i)}), E''(H^{(i)}), E'''(H^{(i)})\big\}$, in which case we have
$$\mathbb{E}\left[\prod_{uv\in \mathsf{S}'_i\cup \mathsf{S}''_i\cup \mathsf{E}^{(i)}_{\geq 2}} \mathbf{x}_u\mathbf{x}_v\right]=1.$$

Notice the following:
\begin{itemize}
\item If $\mathsf{S}'_i\cup \mathsf{S}''_i\cup \mathsf{E}^{(i)}_{\geq 2}=\varnothing$, then $\mathsf{S}'_i= \mathsf{S}''_i= \mathsf{E}^{(i)}_{\geq 2}=\varnothing$, in which case we have
\begin{align*}
\big|E_1\big(H^{(i)}\big)\big|-|\mathsf{S}_i|-|\mathsf{S}_i''|&=\big|E_1\big(H^{(i)}\big)\big|-|\mathsf{S}_i|\\
&=\big|E_1'\big(H^{(i)}\big)\big|-\big|\mathsf{S}_i\cap E_1'\big(H^{(i)}\big)\big|+\big|E_1''\big(H^{(i)}\big)\big|-\big|\mathsf{S}_i\cap E_1''\big(H^{(i)}\big)\big|,
\end{align*}
and
$$|\mathsf{S}_i'|+\big|\mathsf{E}^{(i)}_{\geq 2}\big|=|\mathsf{S}_i''|=0.$$
\item If $\mathsf{S}'_i\cup \mathsf{S}''_i\cup \mathsf{E}^{(i)}_{\geq 2}=E'\big(H^{(i)}\big)$, then $\mathsf{S}_i'=\mathsf{S}_i\cap E'_1\big(H^{(i)}\big)$, $\mathsf{S}_i''=E'_1\big(H^{(i)}\big)\setminus \mathsf{S}_i$ and $\mathsf{E}^{(i)}_{\geq 2}=E_{\geq 2}'\big(H^{(i)}\big)$, in which case we have
\begin{align*}
\big|E_1\big(H^{(i)}\big)&\big|-|\mathsf{S}_i|-|\mathsf{S}_i''|\\
&=\big|E_1'\big(H^{(i)}\big)\big|+\big|E_1''\big(H^{(i)}\big)\big|-\big|\mathsf{S}_i\cap E_1'\big(H^{(i)}\big)\big|-\big|\mathsf{S}_i\cap E_1''\big(H^{(i)}\big)\big|-\big|E'_1\big(H^{(i)}\big)\setminus \mathsf{S}_i\big|\\
&=\big|E_1''\big(H^{(i)}\big)\big|-\big|\mathsf{S}_i\cap E_1''\big(H^{(i)}\big)\big|,
\end{align*}
$$|\mathsf{S}_i'|+\big|\mathsf{E}^{(i)}_{\geq 2}\big|=\big|\mathsf{S}_i\cap E'_1\big(H^{(i)}\big)\big|+ \big|E_{\geq 2}'\big(H^{(i)}\big)\big|,$$
and
$$|\mathsf{S}_i''|=\big|E'_1\big(H^{(i)}\big)\setminus \mathsf{S}_i\big|=\big|E'_1\big(H^{(i)}\big)\big|-\big|\mathsf{S}_i\cap E'_1\big(H^{(i)}\big)\big|.$$
\item If $\mathsf{S}'_i\cup \mathsf{S}''_i\cup \mathsf{E}^{(i)}_{\geq 2}=E''\big(H^{(i)}\big)$, then $\mathsf{S}_i'=\mathsf{S}_i\cap E''_1\big(H^{(i)}\big)$, $\mathsf{S}_i''=E''_1\big(H^{(i)}\big)\setminus \mathsf{S}_i$ and $\mathsf{E}^{(i)}_{\geq 2}=E_{\geq 2}''\big(H^{(i)}\big)$, in which case we have
\begin{align*}
\big|E_1\big(H^{(i)}\big)&\big|-|\mathsf{S}_i|-|\mathsf{S}_i''|\\
&=\big|E_1'\big(H^{(i)}\big)\big|+\big|E_1''\big(H^{(i)}\big)\big|-\big|\mathsf{S}_i\cap E_1'\big(H^{(i)}\big)\big|-\big|\mathsf{S}_i\cap E_1''\big(H^{(i)}\big)\big|-\big|E''_1\big(H^{(i)}\big)\setminus \mathsf{S}_i\big|\\
&=\big|E_1'\big(H^{(i)}\big)\big|-\big|\mathsf{S}_i\cap E_1'\big(H^{(i)}\big)\big|,
\end{align*}
$$|\mathsf{S}_i'|+\big|\mathsf{E}^{(i)}_{\geq 2}\big|=\big|\mathsf{S}_i\cap E''_1\big(H^{(i)}\big)\big|+ \big|E_{\geq 2}''\big(H^{(i)}\big)\big|,$$
and
$$|\mathsf{S}_i''|=\big|E''_1\big(H^{(i)}\big)\setminus \mathsf{S}_i\big|=\big|E''_1\big(H^{(i)}\big)\big|-\big|\mathsf{S}_i\cap E''_1\big(H^{(i)}\big) \big|.$$
\item If $\mathsf{S}'_i\cup \mathsf{S}''_i\cup \mathsf{E}^{(i)}_{\geq 2}=E'''\big(H^{(i)}\big)$, then $\mathsf{S}_i'=\mathsf{S}_i$, $\mathsf{S}_i''=E_1\big(H^{(i)}\big)\setminus \mathsf{S}_i$ and $\mathsf{E}^{(i)}_{\geq 2}=E_{\geq 2}'''(H^{(i)})$, in which case we have
\begin{align*}
\big|E_1\big(H^{(i)}\big)\big|-|\mathsf{S}_i|-|\mathsf{S}_i''|=0,
\end{align*}
\begin{align*}
|\mathsf{S}_i'|+\big|\mathsf{E}^{(i)}_{\geq 2}\big|&=|\mathsf{S}_i|+ \big|E_{\geq 2}'''\big(H^{(i)}\big)\big|,
\end{align*}
and
\begin{align*}
|\mathsf{S}_i''|&=\big|E_1\big(H^{(i)}\big)\big|-|\mathsf{S}_i|.
\end{align*}
\end{itemize}

Applying this to \cref{eq:eq_def_J_H_d_i}, we get

\begin{align*}
J_{H,\mathsf{d},i}&=
\sum_{\mathsf{S}_i\in\mathcal{S}_{\mathsf{d}}^H(E_1(H^{(i)}))}
\Bigg[(-1)^{|E_1'(H^{(i)})|-|\mathsf{S}_i\cap E_1'(H^{(i)})|}\cdot(-1)^{|E_1''(H^{(i)})|-|\mathsf{S}_i\cap E_1''(H^{(i)})|}\\
&\quad\quad\quad\quad+(-1)^{|E_1''(H^{(i)})|-|\mathsf{S}_i\cap E_1''(H^{(i)})|}\cdot \left(\frac{\epsilon}{2}\right)^{|\mathsf{S}_i\cap E_1'(H^{(i)})|+|E_{\geq 2}'(H^{(i)})|}\cdot \left(\frac{\tilde{\epsilon}}{n}\right)^{|E_1'(H^{(i)})| - |\mathsf{S}_i\cap E_1'(H^{(i)})|}\\
&\quad\quad\quad\quad+ (-1)^{|E_1'(H^{(i)})|-|\mathsf{S}_i\cap E_1'(H^{(i)})|}\cdot \left(\frac{\epsilon}{2}\right)^{|\mathsf{S}_i\cap E_1''(H^{(i)})|+|E_{\geq 2}''(H^{(i)})|}\cdot \left(\frac{\tilde{\epsilon}}{n}\right)^{|E_1''(H^{(i)})| - |\mathsf{S}_i\cap E_1''(H^{(i)})|}\\
&\quad\quad\quad\quad\quad\quad\quad\quad\quad\quad\quad\quad\quad\quad\quad\quad\quad\quad \quad\quad+\left(\frac{\epsilon}{2}\right)^{|\mathsf{S}_i|+|E_{\geq 2}'''(H^{(i)})|}\cdot \left(\frac{\tilde{\epsilon}}{n}\right)^{|E_1(H^{(i)})| - |\mathsf{S}_i|}\Bigg].
\end{align*}

It follows from \Cref{lem:lem_sum_structurelly_safe_edge_safe_exists} that if there exists a $\mathsf{d}$-safe edge in $E_1'\big(H^{(i)}\big)$ and a $\mathsf{d}$-safe edge in $E_1''\big(H^{(i)}\big)$, then

\begin{align*}
J_{H,\mathsf{d},i}=
\sum_{\mathsf{S}_i\in\mathcal{S}_{\mathsf{d}}^H(E_1(H^{(i)}))}
\left(\frac{\epsilon}{2}\right)^{|\mathsf{S}_i|+|E_{\geq 2}'''(H^{(i)})|}\cdot \left(\frac{\tilde{\epsilon}}{n}\right)^{|E_1(H^{(i)})| - |\mathsf{S}_i|}.
\end{align*}

On the other hand, if $E_1'\big(H^{(i)}\big)$ does not contain any $\mathsf{d}$-safe edge or $E_1''\big(H^{(i)}\big)$ does not contain any $\mathsf{d}$-safe edge, then
\begin{align*}
|J_{H,\mathsf{d},i}|\leq 
\sum_{\mathsf{S}_i\in\mathcal{S}_{\mathsf{d}}^H(E_1(H^{(i)}))}
\left(1+1+1+1\right)= 4\cdot|\mathcal{S}_{\mathsf{d}}^H(E_1(H^{(i)}))|\leq 4\cdot 2^{|E_1(H^{(i)})|}.
\end{align*}
\end{proof}

\begin{lemma}
\label{lem:lem_lemma_pleasant_each_agreeable_all_types}
Let $H$ be an $(s,t)$-pleasant multigraph. Let $H^{(i)}$ be an agreeable component of $H$ and let $J_{H,\mathsf{d},i}$ be as in \Cref{lem:lem_lemma_pleasant_decompose_on_agreeables}. We have:
\begin{itemize}
\item If every cycle of $E(H^{(i)})$ contains a $\mathsf{d}$-safe edge of multiplicity 1, then
\begin{align*}
J_{H,\mathsf{d},i}=
\sum_{\mathsf{S}_i\in\mathcal{S}_{\mathsf{d}}^H(E_1(H^{(i)}))}
\left(\frac{\epsilon}{2}\right)^{|\mathsf{S}_i|+|E_{\geq 2}'''(H^{(i)})|}\cdot \left(\frac{\tilde{\epsilon}}{n}\right)^{|E_1(H^{(i)})| - |\mathsf{S}_i|}.
\end{align*}
\item If there exists at least one cycle of $E(H^{(i)})$ that does not contain any $\mathsf{d}$-safe edge of multiplicity 1, then
\begin{align*}
|J_{H,\mathsf{d},i}|\leq 4\cdot 2^{|E_1(H^{(i)})|}.
\end{align*}
\end{itemize}
\end{lemma}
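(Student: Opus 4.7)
The plan is to observe that this lemma is essentially a unified restatement of the three preceding sub-lemmas, one per type of agreeable component, and so the proof amounts to a clean case split. By Definition \ref{def:def_agreeable_multigraphs}, every agreeable component $H^{(i)}$ is exactly one of type 1 (its underlying graph is a single cycle), type 2 (two edge-disjoint cycles meeting at a single vertex), or type 3 (two cycles $E'(H^{(i)})$ and $E''(H^{(i)})$ whose intersection is a simple path consisting entirely of multiplicity-$\geq 2$ edges). In each of the three cases, the cycles of $G(H^{(i)})$ are explicitly enumerated, so the phrase ``every cycle of $E(H^{(i)})$ contains a $\mathsf{d}$-safe edge of multiplicity $1$'' translates into the concrete hypothesis of the matching sub-lemma.

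Concretely, in the type-1 case the unique cycle is $E(H^{(i)})$ itself, so the hypothesis reduces to $E_1(H^{(i)})$ containing a $\mathsf{d}$-safe edge, which is exactly the hypothesis of Lemma \ref{lem:lem_lemma_pleasant_each_agreeable_type_1}. In the type-2 case, the cycles of $G(H^{(i)})$ are precisely $E'(H^{(i)})$ and $E''(H^{(i)})$, and the hypothesis therefore coincides with that of Lemma \ref{lem:lem_lemma_pleasant_each_agreeable_type_2}. In the type-3 case, the cycles of $G(H^{(i)})$ are $E'(H^{(i)})$, $E''(H^{(i)})$, and their symmetric difference $E'''(H^{(i)})$; since $E'\cap E''$ lies entirely in $E_{\geq 2}$, any $\mathsf{d}$-safe edge of multiplicity $1$ in $E'$ or $E''$ also lies in $E'''$, so the hypothesis of Lemma \ref{lem:lem_lemma_pleasant_each_agreeable_type_3} (a $\mathsf{d}$-safe edge in each of $E_1'(H^{(i)})$ and $E_1''(H^{(i)})$) is equivalent to the unified ``every cycle'' hypothesis here.

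The conclusion then follows by invoking the appropriate sub-lemma: the closed-form expression for $J_{H,\mathsf{d},i}$ in the ``every cycle has a $\mathsf{d}$-safe multiplicity-$1$ edge'' branch is exactly what each sub-lemma produces, and the bound $|J_{H,\mathsf{d},i}|\leq 4\cdot 2^{|E_1(H^{(i)})|}$ in the other branch matches verbatim. The apparent discrepancy between the formulas in Lemma \ref{lem:lem_lemma_pleasant_each_agreeable_type_1}/\ref{lem:lem_lemma_pleasant_each_agreeable_type_2} (which are written with $|E_{\geq 2}(H^{(i)})|$) and the unified formula (written with $|E_{\geq 2}'''(H^{(i)})|$) is not an issue, since Definition \ref{def:def_E_geq2_triple_prime} sets $E_{\geq 2}'''(H^{(i)})=E_{\geq 2}(H^{(i)})$ for types $1$ and $2$. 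There is no real obstacle: the combinatorial identifications in the three sub-lemmas, each exploiting \Cref{lem:lem_Expectation_Y_Nice} to retain only the terms corresponding to edge-disjoint unions of cycles, have already carried out all the work, and the only care needed is the bookkeeping verification that ``every cycle of $E(H^{(i)})$'' picks out exactly the cycles on which the earlier lemmas impose their safeness assumption.
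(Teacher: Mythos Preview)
Your proposal is correct and takes essentially the same approach as the paper, which simply states that the lemma is a direct corollary of \Cref{lem:lem_lemma_pleasant_each_agreeable_type_1}, \Cref{lem:lem_lemma_pleasant_each_agreeable_type_2} and \Cref{lem:lem_lemma_pleasant_each_agreeable_type_3}. Your additional bookkeeping (in particular the observation that in the type-3 case the multiplicity-1 edges of $E'(H^{(i)})$ and $E''(H^{(i)})$ automatically lie in $E'''(H^{(i)})$, so the ``every cycle'' hypothesis is equivalent to the two-cycle hypothesis of the sub-lemma) is a helpful elaboration of why the corollary is indeed direct.
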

\begin{proof}
This is a direct corollary of \Cref{lem:lem_lemma_pleasant_each_agreeable_type_1}, \Cref{lem:lem_lemma_pleasant_each_agreeable_type_2} and \Cref{lem:lem_lemma_pleasant_each_agreeable_type_3}.
\end{proof}

\begin{lemma}
\label{lem:lem_sum_over_structurally_safe_bound}
Let $H$ be an $(s,t)$-pleasant multigraph and let $H^{(i)}$ be an agreeable component of $H$. We have:
\begin{itemize}
\item If all the edges of $E_1\big(H^{(i)}\big)$ are $\mathsf{d}$-safe, then
\begin{align*}
\sum_{\mathsf{S}_i\in\mathcal{S}_{\mathsf{d}}^H(E_1(H^{(i)}))}
\left(\frac{\epsilon}{2}\right)^{|\mathsf{S}_i|}\cdot \left(\frac{\tilde{\epsilon}}{n}\right)^{|E_1(H^{(i)})| - |\mathsf{S}_i|}=\left(\frac{\epsilon d}{2\tilde{d}}\right)^{|E_1(H^{(i)})|}.
\end{align*}
\item If there exists an edge in $E_1\big(H^{(i)}\big)$ that is not $\mathsf{d}$-safe, then
\begin{align*}
\sum_{\mathsf{S}_i\in\mathcal{S}_{\mathsf{d}}^H(E_1(H^{(i)}))}
\left(\frac{\epsilon}{2}\right)^{|\mathsf{S}_i|}\cdot \left(\frac{\tilde{\epsilon}}{n}\right)^{|E_1(H^{(i)})| - |\mathsf{S}_i|}\leq\frac{1}{\sqrt{n}}\cdot\left(\frac{\epsilon d}{2\tilde{d}}\right)^{|E_1(H^{(i)})|}.
\end{align*}
\end{itemize}
\end{lemma}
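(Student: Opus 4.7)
The proof will split into the two cases stated, with both relying on the algebraic identity $\dfrac{\epsilon}{2}+\dfrac{\tilde{\epsilon}}{n}=\dfrac{\epsilon d}{2\tilde{d}}$, which is immediate from $\tilde{\epsilon}=\frac{\epsilon d^{2}}{2\tilde{d}}$ and $\tilde{d}n+d^{2}=dn$.

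For the first case, when every edge of $E_1(H^{(i)})$ is $\mathsf{d}$-safe, I would observe that no structural safety constraint can ever be violated: at each vertex $v\in V(E_1(H^{(i)}))$ we have $d_1^{H}(v)\leq c_{\mathsf{d}}^{H}(v)$, so trivially $d_{\mathsf{S}_i}(v)\leq d_1^{H}(v)\leq c_{\mathsf{d}}^{H}(v)$ for every $\mathsf{S}_i\subseteq E_1(H^{(i)})$. Hence $\mathcal{S}_{\mathsf{d}}^{H}(E_1(H^{(i)}))$ is the full power set, and the sum collapses by the binomial theorem to $\left(\frac{\epsilon}{2}+\frac{\tilde{\epsilon}}{n}\right)^{|E_1(H^{(i)})|}$, which equals the claimed value by the identity above.

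For the second case, pick a $\mathsf{d}$-unsafe edge of $E_1(H^{(i)})$ and let $v$ be a $\mathsf{d}$-unsafe endpoint, so $k:=d_1^{H}(v)>c:=c_{\mathsf{d}}^{H}(v)$. Let $E_v\subseteq E_1(H^{(i)})$ be the $k$ edges of $E_1(H^{(i)})$ incident to $v$. The key step is to upper bound by relaxing to only the single safety constraint $|\mathsf{S}_i\cap E_v|\leq c$; the remaining $m-k$ edges (where $m=|E_1(H^{(i)})|$) contribute an unconstrained factor of $\left(\frac{\epsilon}{2}+\frac{\tilde{\epsilon}}{n}\right)^{m-k}$. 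Thus
\begin{align*}
\sum_{\mathsf{S}_i\in\mathcal{S}_{\mathsf{d}}^H(E_1(H^{(i)}))}\left(\frac{\epsilon}{2}\right)^{|\mathsf{S}_i|}\left(\frac{\tilde{\epsilon}}{n}\right)^{m-|\mathsf{S}_i|}\leq \left[\sum_{j=0}^{c}\binom{k}{j}\left(\frac{\epsilon}{2}\right)^{j}\left(\frac{\tilde{\epsilon}}{n}\right)^{k-j}\right]\left(\frac{\epsilon d}{2\tilde{d}}\right)^{m-k}.
\end{align*}
Every term in the tail carries $k-j\geq 1$ factors of $\tilde{\epsilon}/n=O(1/n)$. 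Factoring one such factor out and bounding the remainder crudely by $\left(\frac{\epsilon}{2}+\frac{\tilde{\epsilon}}{n}\right)^{k-1}$, the tail is at most $k\cdot\frac{\tilde{\epsilon}}{n}\left(\frac{\epsilon d}{2\tilde{d}}\right)^{k-1}$. Dividing by $\left(\frac{\epsilon d}{2\tilde{d}}\right)^{k}$ gives a ratio of at most $\frac{k\cdot\tilde{\epsilon}/n}{\epsilon d/(2\tilde{d})}=\frac{k\cdot d}{n}$, which is $O(1/n)$ since $k\leq 4$ (the maximum value of $d_1^{H}(v)$ inside any agreeable component, attained only at the pivot of a type-2 component). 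For $n$ large enough this ratio is bounded by $1/\sqrt{n}$, which yields the claimed inequality.

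The analysis is almost entirely routine; the only point requiring care is the uniform bound $k\leq 4$ on the $1$-degrees inside agreeable components, which comes directly from inspecting \Cref{def:def_agreeable_multigraphs}: every vertex of a type-1 or type-3 component has at most two mult-$1$ edges incident, and only the distinguished pivot $u_H$ of a type-2 component can have up to four. Because this $k$ is an absolute constant, the $1/\sqrt{n}$ slack in the statement is far larger than what the argument actually gives (the true ratio is $O(d/n)$), so there is a comfortable margin for large enough $n$.
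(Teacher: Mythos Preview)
Your argument is correct, but the paper's route for the second case is simpler. The paper observes only that the \emph{full} set $E_1(H^{(i)})$ is not $\mathsf{d}$-structurally safe (since the unsafe vertex violates its constraint there), so the sum is bounded by the sum over all \emph{proper} subsets:
\[
\left(\tfrac{\epsilon}{2}+\tfrac{\tilde{\epsilon}}{n}\right)^{m}-\left(\tfrac{\epsilon}{2}\right)^{m}=\left(\tfrac{\epsilon d}{2\tilde d}\right)^{m}\Bigl[1-(1-d/n)^{m}\Bigr]\leq \left(\tfrac{\epsilon d}{2\tilde d}\right)^{m}\cdot O\!\left(\tfrac{st\,d}{n}\right)\leq \tfrac{1}{\sqrt n}\left(\tfrac{\epsilon d}{2\tilde d}\right)^{m}.
\]
This needs no information about the structure of agreeable components. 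Your approach instead isolates the constraint at a single unsafe vertex $v$, bounds the truncated binomial there, and invokes $d_1^H(v)\leq 4$ from the agreeable-component definition. That extra structural input buys you a slightly tighter ratio ($4d/n$ versus $st\,d/n$), but since the target is only $1/\sqrt{n}$, the paper's one-line subtraction is the more economical argument.
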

\begin{proof}
Notice that
\begin{equation}
\label{eq:eq_epsilon_epsilon_tilde}
\frac{\epsilon}{2}+\frac{\tilde{\epsilon}}{n}=\frac{\epsilon}{2} + \frac{\epsilon d^2}{2\tilde{d}n}=\frac{\epsilon}{2}\left(1 + \frac{d^2}{d\left(1-\frac{d}{n}\right)n}\right)=\frac{\epsilon}{2}\left(1 + \frac{d}{n-d}\right)=\frac{\epsilon}{2}\cdot\frac{n}{n-d}=\frac{\epsilon d}{2\tilde{d}}.
\end{equation}
\begin{itemize}
\item If all the edges of $E_1\big(H^{(i)}\big)$ are $\mathsf{d}$-safe, then all subsets of $E_1\big(H^{(i)}\big)$ are $\mathsf{d}$-structurally safe, hence
\begin{align*}
\sum_{\mathsf{S}_i\in\mathcal{S}_{\mathsf{d}}^H(E_1(H^{(i)}))}
\left(\frac{\epsilon}{2}\right)^{|\mathsf{S}_i|}\cdot \left(\frac{\tilde{\epsilon}}{n}\right)^{|E_1(H^{(i)})| - |\mathsf{S}_i|}&=\sum_{\mathsf{S}_i\subseteq E_1(H^{(i)})}
\left(\frac{\epsilon}{2}\right)^{|\mathsf{S}_i|}\cdot \left(\frac{\tilde{\epsilon}}{n}\right)^{|E_1(H^{(i)})| - |\mathsf{S}_i|}\\
&=\left(\frac{\epsilon}{2}+\frac{\tilde{\epsilon}}{n}\right)^{|E_1(H^{(i)})|}=\left(\frac{\epsilon d}{2\tilde{d}}\right)^{|E_1(H^{(i)})|}.
\end{align*}
\item If there exists an edge in $E_1\big(H^{(i)}\big)$ that is not $\mathsf{d}$-safe, then $E_1\big(H^{(i)}\big)$ is not $\mathsf{d}$-structurally safe, hence
\begin{align*}
\sum_{\mathsf{S}_i\in\mathcal{S}_{\mathsf{d}}^H(E_1(H^{(i)}))}&
\left(\frac{\epsilon}{2}\right)^{|\mathsf{S}_i|}\cdot \left(\frac{\tilde{\epsilon}}{n}\right)^{|E_1(H^{(i)})| - |\mathsf{S}_i|}\\
&\leq\sum_{\mathsf{S}_i\subsetneq E_1(H^{(i)})}
\left(\frac{\epsilon}{2}\right)^{|\mathsf{S}_i|}\cdot \left(\frac{\tilde{\epsilon}}{n}\right)^{|E_1(H^{(i)})| - |\mathsf{S}_i|}=\left(\frac{\epsilon}{2}+\frac{\tilde{\epsilon}}{n}\right)^{|E_1(H^{(i)})|}-\left(\frac{\epsilon}{2}\right)^{|E_1(H^{(i)})|}\\
&\stackrel{(\ast)}{=}\left(\frac{\epsilon d}{2\tilde{d}}\right)^{|E_1(H^{(i)})|}-\left(\frac{\epsilon}{2}\right)^{|E_1(H^{(i)})|}=\left(\frac{\epsilon d}{2\tilde{d}}\right)^{|E_1(H^{(i)})|}\left(1-\left(\frac{\tilde{d}}{d}\right)^{|E_1(H^{(i)})|}\right)\\
&=\left(\frac{\epsilon d}{2\tilde{d}}\right)^{|E_1(H^{(i)})|}\left(1-\left(1-\frac{d}{n}\right)^{|E_1(H^{(i)})|}\right)\leq\left(\frac{\epsilon d}{2\tilde{d}}\right)^{|E_1(H^{(i)})|}\left(1-\left(1-\frac{d}{n}\right)^{st}\right)\\
&=\left(\frac{\epsilon d}{2\tilde{d}}\right)^{|E_1(H^{(i)})|}\left(1-\left(1-O\left(\frac{std}{n}\right)\right)\right)\stackrel{(\dagger)}{\leq} \frac{1}{\sqrt{n}}\cdot\left(\frac{\epsilon d}{2\tilde{d}}\right)^{|E_1(H^{(i)})|},
\end{align*}
where $(\ast)$ follows from \cref{eq:eq_epsilon_epsilon_tilde} and $(\dagger)$ is true for $n$ large enough.
\end{itemize}
\end{proof}

\begin{lemma} 
\label{lem:lem_technical_lemma_pleasant_wb_cond_d}
Let $H$ be an $(s,t)$-pleasant multigraph, where $t=K\log n$. Let $E_{\geq 2}'''(H)$ be as in \Cref{def:def_E_geq2_triple_prime}. We have the following:
\begin{itemize}
\item If $S_{\mathsf{d}}^H= E_1(H)$, i.e., if all the edges of $E_1(H)$ are $\mathsf{d}$-safe, then
\begin{align*}
&\left(1-\frac{1}{\sqrt{n}}\right)\cdot\left(\frac{\epsilon d}{2n}\right)^{|E_1(H)|+|E_{\geq 2}'''(H)|}\left(\frac{d}{n}\right)^{|E_{\geq 2}(H)|-|E_{\geq 2}'''(H)|}\\
&\;\;\leq \mathbb{E}\Big[\mathbb{E}\big[\overline{\mathbf{Y}}_H\big|\mathbf{x},\mathcal{E}_{wb,H},\mathbf{D}^H=\mathsf{d}\big]\cdot P_{1,H}^{wb}(\mathbf{x})\Big]\leq \left(\frac{\epsilon d}{2n}\right)^{|E_1(H)|+|E_{\geq 2}'''(H)|}\left(\frac{d}{n}\right)^{|E_{\geq 2}(H)|-|E_{\geq 2}'''(H)|}.
\end{align*}
\item If $S_{\mathsf{d}}^H\neq E_1(H)$ and every cycle in $E(H)$ contains a $\mathsf{d}$-edge of multiplicity 1, then for $n$ large enough, we have
\begin{align*}
\resizebox{0.9\textwidth}{!}{$\displaystyle 0\leq\mathbb{E}\Big[\mathbb{E}\big[\overline{\mathbf{Y}}_H\big|\mathbf{x},\mathcal{E}_{wb,H},\mathbf{D}^H=\mathsf{d}\big]\cdot P_{1,H}^{wb}(\mathbf{x})\Big]\leq \frac{1}{\sqrt{n}}\cdot\left(\frac{\epsilon d}{2n}\right)^{|E_1(H)|+|E_{\geq 2}'''(H)|}\left(\frac{d}{n}\right)^{|E_{\geq 2}(H)|-|E_{\geq 2}'''(H)|}.$}
\end{align*}
\item If there is a cycle in $E(H)$ that does not contain any $\mathsf{d}$-safe edge of multiplicity 1, then for $n$ large enough, we have
\begin{align*}
\resizebox{0.9\textwidth}{!}{$\displaystyle\Big|\mathbb{E}\Big[\mathbb{E}\big[\overline{\mathbf{Y}}_H\big|\mathbf{x},\mathcal{E}_{wb,H},\mathbf{D}^H=\mathsf{d}\big]\cdot P_{1,H}^{wb}(\mathbf{x})\Big]\Big|\leq \left(\frac{16}{\epsilon}\right)^{st}\cdot \left(\frac{\epsilon d}{2n}\right)^{|E_1(H)|+|E_{\geq 2}'''(H)|}\left(\frac{d}{n}\right)^{|E_{\geq 2}(H)|-|E_{\geq 2}'''(H)|}.$}
\end{align*}
\end{itemize}
\end{lemma}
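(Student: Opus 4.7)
My starting point is the decomposition in \Cref{lem:lem_lemma_pleasant_decompose_on_agreeables}, which writes the quantity of interest as a common prefactor $M_H:=(\tilde d/n)^{|E_1(H)|}(1-d/n)^{m_H(E_{\geq 2}(H))}(d/n)^{|E_{\geq 2}(H)|}\cdot\mathbb{P}[\mathcal{E}_{H,b}]$ times $\prod_{i\in[r_H]}J_{H,\mathsf{d},i}$. The two key inputs for the case analysis are \Cref{lem:lem_lemma_pleasant_each_agreeable_all_types}, which gives an exact formula for $J_{H,\mathsf{d},i}$ when every cycle of $H^{(i)}$ contains a $\mathsf{d}$-safe multiplicity-1 edge and the loose bound $|J_{H,\mathsf{d},i}|\leq 4\cdot 2^{|E_1(H^{(i)})|}$ otherwise, and \Cref{lem:lem_sum_over_structurally_safe_bound}, which evaluates the inner sum appearing in that exact formula (with an extra $1/\sqrt n$ factor when some edge is not $\mathsf{d}$-safe).

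In Case 1, every edge of $E_1(H)$ is $\mathsf{d}$-safe, so the exact branch of both lemmas applies to every component and gives $J_{H,\mathsf{d},i}=(\epsilon/2)^{|E_{\geq 2}'''(H^{(i)})|}(\epsilon d/(2\tilde d))^{|E_1(H^{(i)})|}$. The $\tilde d$'s telescope against the prefactor via $(\tilde d/n)^{|E_1(H)|}(\epsilon d/(2\tilde d))^{|E_1(H)|}=(\epsilon d/(2n))^{|E_1(H)|}$, and the $(\epsilon/2)^{|E_{\geq 2}'''(H)|}$ factor converts $(d/n)^{|E_{\geq 2}(H)|}$ into $(\epsilon d/(2n))^{|E_{\geq 2}'''(H)|}(d/n)^{|E_{\geq 2}(H)|-|E_{\geq 2}'''(H)|}$. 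What remains is the scalar $(1-d/n)^{m_H(E_{\geq 2}(H))}\cdot\mathbb{P}[\mathcal{E}_{H,b}]$, which I pin inside $[1-1/\sqrt n,1]$ for $n$ large enough using $m_H(E_{\geq 2}(H))\leq st=O(\log n)$ for the first factor and \Cref{lem:lem_Prob_Balanced_Y} for the second. Case 2 is identical in structure, except that at least one component $H^{(i^\star)}$ contains a non-$\mathsf{d}$-safe multiplicity-1 edge; applying the second branch of \Cref{lem:lem_sum_over_structurally_safe_bound} to that single component injects precisely the desired $1/\sqrt n$ factor. Non-negativity in both cases is automatic: each $J_{H,\mathsf{d},i}$ is then a sum of non-negative monomials and $M_H\geq 0$.

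Case 3 is the main technical obstacle. Here some cycle carries no $\mathsf{d}$-safe multiplicity-1 edge, no exact formula is available, and I must rely on the uniform bound $|J_{H,\mathsf{d},i}|\leq 4\cdot 2^{|E_1(H^{(i)})|}$. The saving grace is the pleasantness hypothesis: every cycle of $H$ has at least $t/A$ multiplicity-1 edges and each agreeable component contains at least one cycle, forcing $r_H\leq A\,|E_1(H)|/t$. With $t=K\log n$ and $n$ large enough, $4^{A/t}\leq 2$, and hence $\prod_i|J_{H,\mathsf{d},i}|\leq 4^{r_H}\cdot 2^{|E_1(H)|}\leq 4^{|E_1(H)|}$. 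Bounding $M_H\leq (\tilde d/n)^{|E_1(H)|}(d/n)^{|E_{\geq 2}(H)|}$ and comparing with the claimed bound reduces to the inequality $(8/\epsilon)^{|E_1(H)|}(2/\epsilon)^{|E_{\geq 2}'''(H)|}\leq (16/\epsilon)^{st}$, which follows from $|E_1(H)|+|E_{\geq 2}'''(H)|\leq st$ and $\epsilon\leq 2$. The delicate point is recognizing and invoking the structural estimate $r_H\leq A\,|E_1(H)|/t$ to absorb the $4^{r_H}$ factor into the $(16/\epsilon)^{st}$ budget; without pleasantness this step would fail outright.
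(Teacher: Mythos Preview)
Your proposal is correct, and Cases 1 and 2 are essentially identical to the paper's argument.

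In Case 3 you take a slightly different route. The paper does \emph{not} apply the crude bound $|J_{H,\mathsf d,i}|\le 4\cdot 2^{|E_1(H^{(i)})|}$ uniformly to all components. Instead it keeps the exact formula for the ``good'' components $i\notin\mathcal I_{\mathsf d}^H$ (bounding the resulting sum by $(\epsilon d/(2\tilde d))^{|E_1(H^{(i)})|}$ via \Cref{lem:lem_sum_over_structurally_safe_bound}) and uses the crude bound only for $i\in\mathcal I_{\mathsf d}^H$. Each stray factor $4$ is then absorbed via $4\cdot(4/\epsilon)^{|E_1(H^{(i)})|}\le(16/\epsilon)^{|E_1(H^{(i)})|}$, which only requires $|E_1(H^{(i)})|\ge 1$. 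Your argument instead applies the crude bound to all $r_H$ components and controls the resulting $4^{r_H}$ using the stronger structural estimate $r_H\le A|E_1(H)|/t$ coming from the $t/A$ lower bound on multiplicity-1 edges per cycle, together with $4^{A/t}\le 2$ for large $n$. Both routes land on the same final inequality $(8/\epsilon)^{|E_1(H)|}(2/\epsilon)^{|E_{\geq 2}'''(H)|}\le(16/\epsilon)^{st}$, but the paper's version is a little more economical: it does not need to invoke the quantitative $t/A$ bound at all, only the qualitative fact that each agreeable component has at least one multiplicity-1 edge.
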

\begin{proof}
From \Cref{lem:lem_lemma_pleasant_decompose_on_agreeables} we have
\begin{align*}
&\mathbb{E}\left[\mathbb{E}\big[\overline{\mathbf{Y}}_H\big|\mathbf{x},\mathcal{E}_{wb,H},\mathbf{D}^H=\mathsf{d}\big]\cdot P_{1,H}^{wb}(\mathbf{x})\right]\\
&\quad\quad\quad\quad\quad\quad\quad\quad=\left(\frac{\tilde{d}}{n}\right)^{|E_1(H)|}\left(1-\frac{d}{n}\right)^{m_H(E_{\geq 2}(H))}\left(\frac{d}{n}\right)^{|E_{\geq 2}(H)|}\cdot\mathbb{P}\left[\mathcal{E}_{H,b}\right]\cdot\prod_{i\in[r_H]}J_{H,\mathsf{d},i},
\end{align*}

We will provide tight bounds on $\mathbb{E}\Big[\mathbb{E}\big[\overline{\mathbf{Y}}_H\big|\mathbf{x},\mathcal{E}_{wb,H},\mathbf{D}^H=\mathsf{d}\big]\cdot P_{1,H}^{wb}(\mathbf{x})\Big]$. In order to do this, we will distinguish between three cases:
\begin{itemize}
\item If $S_{\mathsf{d}}^H=E_1(H)$, then for every $i\in[r_H]$, all the edges in $E_1\big(H^{(i)}\big)$ are $\mathsf{d}$-safe. Now from \Cref{lem:lem_lemma_pleasant_decompose_on_agreeables}, \Cref{lem:lem_lemma_pleasant_each_agreeable_all_types} and \Cref{lem:lem_sum_over_structurally_safe_bound}, and from the fact that $\displaystyle |E_1(H)|=\sum_{i\in [r_H]}\big|E_1\big(H^{(i)}\big)\big|$ and $\displaystyle |E_{\geq 2}'''(H)|=\sum_{i\in [r_H]}\big|E_{\geq 2}'''\big(H^{(i)}\big)\big|$, we can deduce that

\begin{align*}
\mathbb{E}\Big[\mathbb{E}\big[&\overline{\mathbf{Y}}_H\big|\mathbf{x},\mathcal{E}_{wb,H},\mathbf{D}^H=\mathsf{d}\big]\cdot P_{1,H}^{wb}(\mathbf{x})\Big]\\
&=\left(\frac{\epsilon d}{2n}\right)^{|E_1(H)|+|E_{\geq 2}'''(H)|}\left(1-\frac{d}{n}\right)^{m_H(E_{\geq 2}(H))}\left(\frac{d}{n}\right)^{|E_{\geq 2}(H)|-|E_{\geq 2}'''(H)|}\cdot\mathbb{P}\left[\mathcal{E}_{H,b}\right].
\end{align*}

Now from \Cref{lem:lem_Prob_Balanced_Y} we have

\begin{equation*}
\begin{aligned}
\left(1-\frac{d}{n}\right)^{m_H(E_{\geq 2}(H))}\cdot \mathbb{P}\left[\mathcal{E}_{H,b}\right]&\geq \left(1-\frac{d}{n}\right)^{st}\cdot \big(1-2e^{\frac{9}{8}\sqrt{n}}\big)\\
&\geq \left(1-O\left(\frac{dst}{n}\right)\right)\cdot \big(1-2e^{\frac{9}{8}\sqrt{n}}\big)\geq 1-\frac{1}{\sqrt{n}},
\end{aligned}
\end{equation*}

where the last inequality is true for $n$ large enough. Therefore, if $S_{\mathsf{d}}^H=E_1(H)$, we have

\begin{align*}
&\left(1-\frac{1}{\sqrt{n}}\right)\cdot\left(\frac{\epsilon d}{2n}\right)^{|E_1(H)|+|E_{\geq 2}'''(H)|}\left(\frac{d}{n}\right)^{|E_{\geq 2}(H)|-|E_{\geq 2}'''(H)|}\\
&\;\;\leq \mathbb{E}\Big[\mathbb{E}\big[\overline{\mathbf{Y}}_H\big|\mathbf{x},\mathcal{E}_{wb,H},\mathbf{D}^H=\mathsf{d}\big]\cdot P_{1,H}^{wb}(\mathbf{x})\Big]\leq \left(\frac{\epsilon d}{2n}\right)^{|E_1(H)|+|E_{\geq 2}'''(H)|}\left(\frac{d}{n}\right)^{|E_{\geq 2}(H)|-|E_{\geq 2}'''(H)|}.
\end{align*}
\item If $S_{\mathsf{d}}^H\neq E_1(H)$ and every cycle in $E(H)$ has at least one $\mathsf{d}$-safe edge of multiplicity 1, define $$\mathcal{I}^{H,\mathsf{d}}_{E\not{subseteq} S}=\big\{i\in[r_H]:\; E_1\big(H^{(i)}\big)\not{subseteq} S_{\mathsf{d}}^H\big\}.$$ 

Since $S_{\mathsf{d}}^H\neq E_1(H)$, then we must have $\mathcal{I}^{H,\mathsf{d}}_{E\not{subseteq} S}\neq\varnothing$. Now from \Cref{lem:lem_lemma_pleasant_decompose_on_agreeables}, \Cref{lem:lem_lemma_pleasant_each_agreeable_all_types}, \Cref{lem:lem_sum_over_structurally_safe_bound}, and from the fact that $\displaystyle |E_1(H)|=\sum_{i\in [r_H]}\big|E_1\big(H^{(i)}\big)\big|$ and $\displaystyle |E_{\geq 2}'''(H)|=\sum_{i\in [r_H]}\big|E_{\geq 2}'''\big(H^{(i)}\big)\big|$, we can deduce that

\begin{align*}
0&\leq \mathbb{E}\Big[\mathbb{E}\big[\overline{\mathbf{Y}}_H\big|\mathbf{x},\mathcal{E}_{wb,H},\mathbf{D}^H=\mathsf{d}\big]\cdot P_{1,H}^{wb}(\mathbf{x})\Big]\\
&\leq \left(\frac{1}{\sqrt{n}}\right)^{|\mathcal{I}^{H,\mathsf{d}}_{E\not{subseteq} S}|}\left(\frac{\epsilon d}{2n}\right)^{|E_1(H)|+|E_{\geq 2}'''(H)|}\left(1-\frac{d}{n}\right)^{m_H(E_{\geq 2}(H))}\left(\frac{d}{n}\right)^{|E_{\geq 2}(H)|-|E_{\geq 2}'''(H)|}\cdot\mathbb{P}\left[\mathcal{E}_{H,b}\right]\\
&\leq \frac{1}{\sqrt{n}}\cdot\left(\frac{\epsilon d}{2n}\right)^{|E_1(H)|+|E_{\geq 2}'''(H)|}\left(\frac{d}{n}\right)^{|E_{\geq 2}(H)|-|E_{\geq 2}'''(H)|}.
\end{align*}

\item If there is a cycle in $E(H)$ that does not contain any $\mathsf{d}$-safe edge of multiplicity 1, define
\begin{align*}
\mathcal{I}_{\mathsf{d}}^H=\Big\{i\in[r_H]:\; H^{(i)}&\text{ contains a cycle that does not contain}\\
&\quad\quad\quad\quad\quad\quad\quad\quad\text{any }\mathsf{d}\text{-safe edge of multiplicity 1}\Big\}.
\end{align*}
From \Cref{lem:lem_lemma_pleasant_decompose_on_agreeables}, \Cref{lem:lem_lemma_pleasant_each_agreeable_all_types}, \Cref{lem:lem_sum_over_structurally_safe_bound}, from the fact that $\displaystyle |E_1(H)|=\sum_{i\in [r_H]}\big|E_1\big(H^{(i)}\big)\big|$, and from the definition of $\mathcal{I}_{\mathsf{d}}^H$, we can deduce that

\begin{align*}
&\left|\mathbb{E}\Big[\mathbb{E}\big[\overline{\mathbf{Y}}_H\big|\mathbf{x},\mathcal{E}_{wb,H},\mathbf{D}^H=\mathsf{d}\big]\cdot P_{1,H}^{wb}(\mathbf{x})\Big]\right|\\
&\leq \left(\frac{\epsilon d}{2n}\right)^{|E_1(H)|}\left(1-\frac{d}{n}\right)^{m_H(E_{\geq 2}(H))}\left(\frac{d}{n}\right)^{|E_{\geq 2}(H)|}\cdot\mathbb{P}\left[\mathcal{E}_{H,b}\right] \cdot\prod_{i\in \mathcal{I}_{\mathsf{d}}^H}\left[4\cdot\left(\frac{4}{\epsilon}\right)^{|E_1(H^{(i)})|}\right]\\
&\leq \left(\frac{\epsilon d}{2n}\right)^{|E_1(H)|}\left(\frac{d}{n}\right)^{|E_{\geq 2}(H)|}\cdot\prod_{i\in \mathcal{I}_{\mathsf{d}}^H}\left[\left(\frac{16}{\epsilon}\right)^{|E_1(H^{(i)})|}\right]\leq \left(\frac{\epsilon d}{2n}\right)^{|E_1(H)|}\left(\frac{d}{n}\right)^{|E_{\geq 2}(H)|}\cdot\left(\frac{16}{\epsilon}\right)^{|E_1(H)|}\\
&\leq \left(\frac{\epsilon d}{2n}\right)^{|E_1(H)|+|E_{\geq 2}'''(H)|}\left(\frac{d}{n}\right)^{|E_{\geq 2}(H)|-|E_{\geq 2}'''(H)|}\cdot\left(\frac{16}{\epsilon}\right)^{|E_1(H)|+|E_{\geq 2}'''(H)|}\\
&\leq \left(\frac{16}{\epsilon}\right)^{st}\cdot \left(\frac{\epsilon d}{2n}\right)^{|E_1(H)|+|E_{\geq 2}'''(H)|}\left(\frac{d}{n}\right)^{|E_{\geq 2}(H)|-|E_{\geq 2}'''(H)|}.
\end{align*}
\end{itemize}
\end{proof}

\begin{lemma}
\label{lem:lem_bound_prob_large_D_wb}
Let $H$ be an arbitrary multigraph with at most $st=sK\log n$ vertices. If $A\geq 1$, then for every $v\in \mathcal{S}_1(H)\cap\mathcal{S}_{\geq 2}(H)$, we have
$$\mathbb{P}\left[\mathbf{D}^H_v + d_1^H(v) + d_{\geq 2}^H(v)>\Delta\middle|\mathbf{x},\mathcal{E}_{wb,H}\right]\leq \frac{1}{2}\cdot\left(\frac{2\epsilon}{33}\right)^{4 A^2 s}.$$
\end{lemma}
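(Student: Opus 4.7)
The plan is to reduce the computation to a binomial tail bound, in close analogy with the proof of \cref{lem:lem_bound_prob_large_outside_deg}, and then calibrate the bound using the specific form of $\Delta$ given in \cref{eq:eq_Delta_form}.

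First, I would invoke \cref{lem:lem_prob_nice_d} to replace the conditioning on $\mathcal{E}_{wb,H}$ by the conditioning on $\mathcal{E}_{H,b}$: since the lemma shows that $\mathbb{P}[\mathbf{D}^H=\mathsf{d}\mid\mathbf{x},\mathcal{E}_{wb,H}]=P_{wb}(\mathsf{d})=\mathbb{P}[\mathbf{D}^H=\mathsf{d}\mid\mathbf{x},\mathcal{E}_{H,b}]$, the distribution of $\mathbf{D}^H_v$ given $\mathbf{x}$ and $\mathcal{E}_{wb,H}$ is just the sum of two independent binomials with parameters $(\lceil n/2-n^{3/4}\rceil,(1\pm\epsilon/2)d/n)$.

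Next, since $v\in\mathcal{S}_1(H)\cap\mathcal{S}_{\geq 2}(H)$, \cref{def:def_deg1_classification} and \cref{def:def_deg2_classification} give $d_1^H(v)\leq\tau$ and $d_{\geq 2}^H(v)\leq \Delta/4$, and by the choice of $\tau$ in \cref{eq:eq_deg1_Threshold} together with the lower bound on $\Delta$ in \cref{eq:eq_Delta_form}, one checks that $\tau\leq\Delta/4$. Hence $d_1^H(v)+d_{\geq 2}^H(v)\leq \Delta/2$, so the event $\{\mathbf{D}^H_v+d_1^H(v)+d_{\geq 2}^H(v)>\Delta\}$ is contained in $\{\mathbf{D}^H_v>\Delta/2\}$.

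Then I would bound the tail by a union bound: $\mathbf{D}^H_v$ is stochastically dominated by $\mathrm{Binomial}(n,2d/n)$ (each of the at most $n$ candidate edges is present with probability at most $(1+\epsilon/2)d/n\leq 2d/n$), so
\begin{align*}
\mathbb{P}\big[\mathbf{D}^H_v>\Delta/2\,\big|\,\mathbf{x},\mathcal{E}_{wb,H}\big]\leq\binom{n}{\lceil\Delta/2\rceil}\left(\frac{2d}{n}\right)^{\lceil\Delta/2\rceil}\leq\left(\frac{4ed}{\Delta}\right)^{\Delta/2},
\end{align*}
where the last step uses $\binom{n}{k}\leq (en/k)^k$. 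Since $\Delta\geq 128e^4d^4$, the base is at most $1/(32 e^3d^3)\leq 2^{-5}$, so the right-hand side is at most $2^{-5\Delta/2}$, which equivalently gives a bound of the form $e^{-c\Delta}$ for an absolute constant $c>0$.

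Finally, the main (and only substantive) obstacle is converting this $e^{-c\Delta}$ bound into the target $\tfrac{1}{2}(2\epsilon/33)^{4A^2 s}$. For this I would use the third term in the max defining $\Delta$ in \cref{eq:eq_Delta_form}, namely $\Delta\geq 2\log(2As)+12A\tau s^2\log 2+8A^2\tau^2 s^2(\log(6/\epsilon))^2$, and the fact that $\tau\geq 1$ so $\tau^2 s^2\log(6/\epsilon)\geq s\log(33/(2\epsilon))$ after absorbing constants. Plugging this lower bound into $c\Delta$ and exponentiating yields a bound of the required shape. This is exactly the same style of calibration used at the end of the proof of \cref{lem:lem_bound_prob_large_outside_deg}, just tuned to the new target constants $33/(2\epsilon)$ and exponent $4A^2 s$.
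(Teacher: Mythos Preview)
Your approach is correct and essentially matches the paper's. The paper's proof does exactly the same three reductions: use $v\in\mathcal{S}_1(H)\cap\mathcal{S}_{\geq 2}(H)$ to replace the event by $\{\mathbf{D}^H_v>\Delta/2\}$, drop the conditioning on $\mathcal{E}_{H,g}\cap\mathcal{E}_{H,d}$ (leaving only $\mathcal{E}_{H,b}$), and then bound the resulting binomial tail. The only cosmetic difference is that the paper does not re-derive the tail bound; it uses $\mathbf{D}^H_v\le d^o_{\mathbf{G}-G(H)}(v)$ and invokes \cref{lem:lem_bound_prob_large_outside_deg} directly to obtain $\eta/2=\tfrac{1}{2As\cdot 2^{6A\tau s^2}}(\epsilon/6)^{4s^2\tau^2}$, and then performs the calibration by the one-line observation $\tau=As\log(6/\epsilon)\ge A\ge 1$ (so $s^2\tau^2\ge A^2 s$ and $A\tau s^2\ge A^2 s$), which gives $\tfrac{1}{2}(\epsilon/(6\cdot 2^{3/2}))^{4A^2 s}\le\tfrac{1}{2}(2\epsilon/33)^{4A^2 s}$ immediately.
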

\begin{proof}
We have
\begin{align*}
\mathbb{P}&\left[\mathbf{D}^H_v + d_1^H(v) + d_{\geq 2}^H(v)>\Delta\middle|\mathbf{x},\mathcal{E}_{wb,H}\right]\\
&\quad\quad\quad\leq \mathbb{P}\left[\mathbf{D}^H_v + \tau + \frac{\Delta}{4}>\Delta\middle|\mathbf{x},\mathcal{E}_{wb,H}\right]\leq \mathbb{P}\left[\mathbf{D}^H_v>\frac{\Delta}{2}\middle|\mathbf{x},\mathcal{E}_{wb,H}\right]\\
&\quad\quad\quad= \mathbb{P}\left[\mathbf{D}^H_v>\frac{\Delta}{2}\middle|\mathbf{x},\mathcal{E}_{H,b}\cap \mathcal{E}_{H,g}\cap \mathcal{E}_{H,d}\right]\stackrel{(\ast)}{=} \mathbb{P}\left[\mathbf{D}^H_v>\frac{\Delta}{2}\middle|\mathbf{x},\mathcal{E}_{H,b}\right]\\
&\quad\quad\quad\stackrel{(\dagger)}{\leq} \mathbb{P}\left[d_{\mathbf{G}-G(H)}^o(v) >\frac{\Delta}{2}\middle|\mathbf{x},\mathcal{E}_{H,b}\right]\stackrel{(\ddagger)}{\leq} \frac{1}{2As\cdot 2^{6A\tau s^2}}\left(\frac{\epsilon}{6}\right)^{4s^2\tau^2}\stackrel{(\wr)}{\leq} \frac{1}{2\cdot 2^{6A^2 s}}\left(\frac{\epsilon}{6}\right)^{4 A^2 s}\\
&\quad\quad\quad=\frac{1}{2}\cdot\left(\frac{\epsilon}{6\cdot 2^{3/2}}\right)^{4 A^2 s}\leq \frac{1}{2}\cdot\left(\frac{2\epsilon}{33}\right)^{4 A^2 s},
\end{align*}
where $(\ast)$ follows from the fact that given $\mathbf{x}$ and $\mathcal{E}_{H,b}$, the random variable $\mathbf{D}^H_v$ is conditionally independent from $(\mathcal{E}_{H,g},\mathcal{E}_{H,d})$, $(\dagger)$ follows from the fact that $\mathbf{D}_v^H\leq d_{\mathbf{G}-G(H)}^o(v)$, and $(\ddagger)$ follows from  \Cref{lem:lem_bound_prob_large_outside_deg} and the fact that $\mathcal{E}_{H,b}$ is $\sigma(\mathbf{x})$-measurable. $(\wr)$ follows from the fact that $\tau=As \log\frac{6}{\epsilon}\geq A\geq 1$ and the fact that $s\geq 1$.
\end{proof}

\begin{lemma}
\label{lem:lem_pleasant_prob_S_is_empty}
Let $H$ be an $(s,t)$-pleasant multigraph where $t=K\log n$ vertices. Define the set
\begin{align*}
\mathsf{D}_{\text{cycles}}^{H,\text{safe}}=\Big\{\mathsf{d}\in\mathbb{N}^{V(H)}:\; \text{Every cycle of } H&\text{ has at least one }\mathsf{d}\text{-safe edge of multiplicity 1}\Big\}.
\end{align*}
We have:
\begin{align*}
\mathbb{P}\left[\mathbf{D}^H\notin \mathsf{D}_{\text{cycles}}^{H,\text{safe}}\middle|\mathbf{x},\mathcal{E}_{wb,H}\right]\leq 3st\cdot \left(\frac{2\epsilon}{33}\right)^{A st}.
\end{align*}
\end{lemma}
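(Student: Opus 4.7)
The plan is to union-bound the bad event over cycles of $H$, and for each cycle, exhibit many disjoint multiplicity-1 edges whose endpoints would all have to be $\mathsf{d}$-unsafe for the cycle to contain no $\mathsf{d}$-safe multiplicity-1 edge; then apply the conditional independence of $(\mathbf{D}^H_v)_{v}$ given by \cref{lem:lem_prob_nice_d} together with the per-vertex tail \cref{lem:lem_bound_prob_large_D_wb}. Concretely, because $H$ is pleasant, every cycle of $H$ lies inside some agreeable component $H^{(i)}$, each $H^{(i)}$ contributes at most $3$ simple cycles (the two cycles of a theta graph plus their symmetric difference), and $r_H\leq st$, so there are at most $3st$ cycles over which to union-bound. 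Moreover by \Cref{def:def_pleasant_multigraphs} each cycle $C$ has at least $\ell_C^{(1)}\geq t/A$ multiplicity-$1$ edges.

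For a fixed cycle $C$, the key observation is that the proof of \cref{lem:lem_bound_prob_large_D_wb} only uses $d_1^H(v)+d_{\geq 2}^H(v)\leq \Delta/2$, so the tail bound $\frac{1}{2}(2\epsilon/33)^{4A^2s}$ on $\mathbf{D}^H_v+d_1^H(v)+d_{\geq 2}^H(v)>\Delta$ applies to every vertex $v$ with $d_{G(H)}(v)\leq \Delta/2$. The number of vertices $v\in V(H)$ violating $d_{G(H)}(v)\leq \Delta/2$ is at most $4|E(H)|/\Delta\leq 4st/\Delta\leq t/(10A)$ using $\Delta\geq 40Asd$. Since each such heavy vertex lies on at most two edges of $C$, removing the multiplicity-$1$ edges of $C$ incident to heavy vertices discards at most $t/(5A)$ edges and leaves a subgraph $F\subseteq C$ of multiplicity-$1$ edges with $|E(F)|\geq t/A-t/(5A)=4t/(5A)$ whose endpoints all satisfy the hypothesis of the extended tail bound. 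Since $F$ is a subgraph of a cycle, it has maximum degree $2$, so it admits a matching $M$ with $|M|\geq \lfloor |E(F)|/2\rfloor\geq 2t/(5A)$.

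If no multiplicity-$1$ edge of $C$ is $\mathsf{d}$-safe, then in particular every edge of $M$ has at least one $\mathsf{d}$-unsafe endpoint; picking one such endpoint per matching edge gives $|M|$ distinct $\mathsf{d}$-unsafe vertices, all satisfying the extended hypothesis, and there are at most $2^{|M|}$ such choices. By \cref{lem:lem_prob_nice_d} the variables $(\mathbf{D}^H_v)_{v\in V(H)}$ are conditionally independent given $\mathbf{x},\mathcal{E}_{wb,H}$, hence for any fixed set $S$ of $|M|$ such endpoints
\[
\mathbb{P}\bigl[\forall v\in S:\;v\text{ is }\mathsf{d}\text{-unsafe}\,\bigm|\,\mathbf{x},\mathcal{E}_{wb,H}\bigr]\;\leq\;\Bigl(\tfrac{1}{2}(2\epsilon/33)^{4A^2s}\Bigr)^{|M|}.
\]
Multiplying by $2^{|M|}$ and using $|M|\geq 2t/(5A)$ gives a per-cycle bound of $(2\epsilon/33)^{4A^2s\cdot |M|}\leq (2\epsilon/33)^{8Ast/5}\leq (2\epsilon/33)^{Ast}$ (since $2\epsilon/33<1$). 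A union bound over the at most $3st$ cycles of $H$ yields the claim.

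The main technical obstacle is the bookkeeping that the discarded heavy vertices do not destroy too many multiplicity-$1$ edges of $C$; this hinges on the polynomial-in-$(A,s,d)$ lower bound on $\Delta$ from \cref{eq:eq_Delta_form}, which ensures $4st/\Delta\ll t/A$ and leaves enough room for a matching of size proportional to $t/A$. A secondary point is noting that the bound of \cref{lem:lem_bound_prob_large_D_wb}, though stated for $v\in\mathcal{S}_1(H)\cap\mathcal{S}_{\geq 2}(H)$, really only requires $d_{G(H)}(v)\leq \Delta/2$; this weaker hypothesis is what makes the count of heavy vertices small enough for the matching argument to go through.
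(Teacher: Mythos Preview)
Your proof is correct and follows essentially the same approach as the paper: union-bound over the at most $3st$ cycles of the pleasant multigraph, and for each cycle extract a matching of multiplicity-1 edges whose endpoints are ``light'' so that \cref{lem:lem_bound_prob_large_D_wb} applies, then use the conditional independence of $(\mathbf{D}^H_v)_v$ from \cref{lem:lem_prob_nice_d}. The only differences are cosmetic: the paper first extracts a matching $C'\subset E_1(C)$ and then discards edges touching $\mathcal{I}_{\geq 2}(H)$ (vertices with $d^H_{\geq 2}(v)>\Delta/4$), whereas you first discard edges touching vertices with $d_{G(H)}(v)>\Delta/2$ and then extract a matching; also, the paper bounds $\Pr[u\text{ or }v\text{ unsafe}]\leq 2\cdot\tfrac12(2\epsilon/33)^{4A^2s}$ per matching edge and multiplies, while you union-bound over the $2^{|M|}$ choices of an unsafe endpoint per edge --- these give the identical per-cycle bound $(2\epsilon/33)^{4A^2s|M|}$. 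Your observation that the proof of \cref{lem:lem_bound_prob_large_D_wb} really only needs $d_{G(H)}(v)\leq\Delta/2$ is correct and matches how the paper uses it (for pleasant $H$ one has $d_1^H(v)\leq 4$, so the paper's condition $d^H_{\geq 2}(v)\leq\Delta/4$ implies yours anyway).
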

\begin{proof}
Let $C$ be an arbitrary cycle of $H$. Since $C$ is a cycle, it is easy to see that there is a subset $C'\subset E_1(C)$ of size at least $$|C'|\geq\left\lfloor\frac{|E_1(C)|}{2}\right\rfloor\geq\frac{|E_1(C)|}{2}-1,$$ in such a way that no two edges in $C'$ are incident to each other. Let 
\begin{align*}
C''&=\left\{uv\in C':\; d_{\geq 2}^H(u)\leq \frac{\Delta}{4}\text{ and }d_{\geq 2}^H(v)\leq \frac{\Delta}{4}\right\}\\
&=\big\{e\in  C':\; e\text{ is not incident to any vertex in }\mathcal{I}_{\geq 2}(H)\big\}.
\end{align*}

Since there are no edges in $C'$ that are incident to each other, it is easy to see that every vertex in $\mathcal{I}_{\geq 2}(H)$ is incident to at most one vertex in $E_1(C)$. Therefore, $|C''|\geq |C'|-|\mathcal{I}_{\geq 2}(H)|$. Now from \cref{eq:eq_Bound_on_I2_and_L2} and \cref{eq:eq_Delta_form} we have $$|\mathcal{I}_{\geq2}(H)|\leq \frac{8st}{\Delta}\leq\frac{8st}{40Asd}=\frac{t}{5Ad}\leq\frac{t}{5A}.$$ Hence,
$$|C''|\geq |C'|-|\mathcal{I}_{\geq 2}(H)|\geq  \frac{|E_1(C)|}{2}-1-\frac{t}{5A}.$$
Recall that every cycle of an $(s,t)$ pleasant multigraph contains at least $\frac{t}{A}$ edges of multiplicity 1, hence $|E_1(C)|\geq\frac{t}{A}$. Therefore,
\begin{equation}
\label{eq:eq_C_i_double_prime}
|C''|\geq \frac{t}{2A}-1-\frac{t}{5A}\geq \frac{t}{4A}.
\end{equation}

If $C$ does not contain any $\mathbf{D}^H$-safe of multiplicity 1, then for every $uv\in C''\subset E_1(C)$, either $u$ is $\mathbf{D}^H$-unsafe or $v$ is unsafe, i.e., we must have $\mathbf{D}^H_u + d_{\geq 2}^H(u)>\Delta-2$ or $\mathbf{D}^H_v + d_{\geq 2}^H(v)>\Delta-2$. Therefore,

\begin{align*}
&\mathbb{P}\left[\text{$C$ does not contain any $\mathbf{D}^H$-safe of multiplicity 1}\middle|\mathbf{x},\mathcal{E}_{wb,H}\right]\\
&\leq \mathbb{P}\left[\left\{\forall uv\in C'', \mathbf{D}^H_u + d_{\geq 2}^H(u)>\Delta-2\text{ or }\mathbf{D}^H_v + d_{\geq 2}^H(v)>\Delta-2\right\}\middle|\mathbf{x},\mathcal{E}_{wb,H}\right]\\
&\stackrel{(\ast)}{=}\prod_{uv\in C''} \mathbb{P}\left[\left\{\mathbf{D}^H_u + d_{\geq 2}^H(u)>\Delta-2\text{ or }\mathbf{D}^H_v + d_{\geq 2}^H(v)>\Delta-2\right\}\middle|\mathbf{x},\mathcal{E}_{wb,H}\right]\\
&\leq\resizebox{0.9\textwidth}{!}{$\displaystyle\prod_{uv\in C''}\left( \mathbb{P}\left[\mathbf{D}^H_u + d_1^H(u)+d_{\geq 2}^H(u)>\Delta\middle|\mathbf{x},\mathcal{E}_{wb,H}\right] + \mathbb{P}\left[\mathbf{D}^H_v +d_1^H(v)+ d_{\geq 2}^H(v)>\Delta\middle|\mathbf{x},\mathcal{E}_{wb,H}\right]\right)$}\\
&\stackrel{(\dagger)}{=}\prod_{uv\in C''} \left(\frac{1}{2}\cdot\left(\frac{2\epsilon}{33}\right)^{4 A^2 s}+\frac{1}{2}\cdot\left(\frac{2\epsilon}{33}\right)^{4 A^2 s}\right)= \left(\frac{2\epsilon}{33}\right)^{4 A^2 s\cdot|C''|}\stackrel{(\star)}{\leq} \left(\frac{2\epsilon}{33}\right)^{4 A^2 s\cdot\frac{t}{4A}}=\left(\frac{2\epsilon}{33}\right)^{A st},
\end{align*}
where $(\ast)$ follows from the fact that given $\mathbf{x}$ and $\mathcal{E}_{wb,H}$, the random variables $\left(\mathbf{D}_v^H\right)_{v\in V(H)}$ are conditionally mutually independent and from the fact that for every two different edges $u_1v_1,u_2v_2\in C''$, we have\footnote{This follows from the fact that there are no edges in $C''$ that are incident to each other.} $\{u_1,v_1\}\cap\{u_2,v_2\}=\varnothing$. $(\dagger)$ follows from \Cref{lem:lem_bound_prob_large_D_wb}. $(\star)$ follows from \cref{eq:eq_C_i_double_prime}.

Now since $H$ is $(s,t)$-pleasant, there are $r_H$ agreeable components of $H$. Furthermore, each agreeable component contains 1, 2 or 3 cycles depending on whether the agreeable component is of type 1, 2 or 3, respectively. Since the cycles of $H$ are exactly those of its agreeable components, we conclude that there are at most $3r_H\leq 3st$ cycles in $H$. We conclude that
\begin{align*}
\mathbb{P}\left[\mathbf{D}^H\notin \mathsf{D}_{\text{cycles}}^{H,\text{safe}}\middle|\mathbf{x},\mathcal{E}_{wb,H}\right]\leq 3st\cdot \left(\frac{2\epsilon}{33}\right)^{A st}.
\end{align*}
\end{proof}

\begin{lemma}
\label{lem:lem_pleasant_expectation_S_is_empty}
Let $H$ be an $(s,t)$-pleasant multigraph where $t=K\log n$. If $A>\max\left\{1,\frac{100}{K}\right\}$ and $n$ is large enough, then
\begin{align*}
\sum_{\substack{\mathsf{d}\in\mathbb{N}^{V(H)}:\\\mathsf{d}\notin\mathsf{D}_{\text{cycles}}^{H,\text{safe}}}} \left|\mathbb{E}\Big[\mathbb{E}\big[\overline{\mathbf{Y}}_H\big|\mathbf{x},\mathcal{E}_{wb,H},\mathbf{D}^H=\mathsf{d}\big]\cdot P_{1,H}^{wb}(\mathbf{x})\Big]\right|\cdot P_{wb}(\mathsf{d})\leq\frac{1}{\sqrt{n}}\cdot \left(\frac{\epsilon d}{2n}\right)^{|E_1(H)|}\left(\frac{d}{n}\right)^{|E_{\geq 2}(H)|},
\end{align*}
where $P_{wb}(\mathsf{d})=\mathbb{P}\left[D^H=\mathsf{d}\middle|\mathbf{x},\mathcal{E}_{wb,H}\right]$ is as in \Cref{lem:lem_prob_nice_d}.
\end{lemma}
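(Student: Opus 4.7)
The plan is to combine the per-$\mathsf{d}$ bound from the third bullet of \cref{lem:lem_technical_lemma_pleasant_wb_cond_d} with the tail estimate on $\mathbf{D}^H$ from \cref{lem:lem_pleasant_prob_S_is_empty}. The key observation is that the per-$\mathsf{d}$ bound we must invoke (the ``bad'' case) holds uniformly in $\mathsf{d}$ over the summation range, so it factors out of the sum and leaves only $\sum_{\mathsf{d}\notin\mathsf{D}^{H,\mathrm{safe}}_{\mathrm{cycles}}} P_{wb}(\mathsf{d})$, which is exactly the probability bounded in \cref{lem:lem_pleasant_prob_S_is_empty}.

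More concretely, first I would observe that every $\mathsf{d}\notin\mathsf{D}^{H,\mathrm{safe}}_{\mathrm{cycles}}$ falls into the third case of \cref{lem:lem_technical_lemma_pleasant_wb_cond_d}, so for such $\mathsf{d}$,
\[
\Big|\mathbb{E}\big[\mathbb{E}[\overline{\mathbf{Y}}_H\mid \mathbf{x},\mathcal{E}_{wb,H},\mathbf{D}^H=\mathsf{d}]\cdot P_{1,H}^{wb}(\mathbf{x})\big]\Big|\leq \left(\frac{16}{\epsilon}\right)^{st}\left(\frac{\epsilon d}{2n}\right)^{|E_1(H)|+|E_{\geq 2}'''(H)|}\left(\frac{d}{n}\right)^{|E_{\geq 2}(H)|-|E_{\geq 2}'''(H)|}.
\]
Since $\frac{\epsilon}{2}\leq 1$, the right-hand side is bounded by $\big(\frac{16}{\epsilon}\big)^{st}\big(\frac{\epsilon d}{2n}\big)^{|E_1(H)|}\big(\frac{d}{n}\big)^{|E_{\geq 2}(H)|}$, which is independent of $\mathsf{d}$. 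Pulling it out of the sum, what remains is $\sum_{\mathsf{d}\notin\mathsf{D}^{H,\mathrm{safe}}_{\mathrm{cycles}}} P_{wb}(\mathsf{d}) = \mathbb{P}[\mathbf{D}^H\notin\mathsf{D}^{H,\mathrm{safe}}_{\mathrm{cycles}}\mid \mathbf{x},\mathcal{E}_{wb,H}]$, which \cref{lem:lem_pleasant_prob_S_is_empty} bounds by $3st\cdot(2\epsilon/33)^{Ast}$.

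Combining the two pieces gives a bound
\[
\left(\frac{16}{\epsilon}\right)^{st}\cdot 3st\cdot\left(\frac{2\epsilon}{33}\right)^{Ast}\cdot\left(\frac{\epsilon d}{2n}\right)^{|E_1(H)|}\left(\frac{d}{n}\right)^{|E_{\geq 2}(H)|},
\]
so it suffices to show the prefactor is at most $1/\sqrt n$. The exponential prefactor rearranges as $\big(\frac{16}{\epsilon}\cdot\frac{2\epsilon}{33}\big)^{st}\cdot\big(\frac{2\epsilon}{33}\big)^{(A-1)st}=\big(\frac{32}{33}\big)^{st}\big(\frac{2\epsilon}{33}\big)^{(A-1)st}$, and since $\epsilon\leq 2$ this is at most $\big(\frac{4}{33}\big)^{(A-1)st}$. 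Substituting $t=K\log n$, this equals $n^{-(A-1)sK\log(33/4)}$, so under $A>100/K$ and $s\geq 1$ the exponent is much larger than $1/2$, and the polynomial factor $3st=O(\log n)$ is harmless.

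I do not expect a serious obstacle here: all the hard work (the structural case analysis giving the third-bullet bound, and the Poisson-tail-type estimate on $\mathbf{D}^H$) has already been done in \cref{lem:lem_technical_lemma_pleasant_wb_cond_d} and \cref{lem:lem_pleasant_prob_S_is_empty}. The only thing to be careful about is the mild loss from bounding $(\epsilon/2)^{|E_{\geq 2}'''(H)|}\leq 1$, which is cheap because the gap between $\big(\frac{16}{\epsilon}\cdot\frac{2\epsilon}{33}\big)^{st}=\big(\frac{32}{33}\big)^{st}$ and $1$ is irrelevant once $A$ is large enough to produce a polynomial-in-$n$ decay. The condition $A>\max\{1,100/K\}$ is comfortably sufficient to absorb both this slack and the $3st$ polynomial factor for $n$ large enough.
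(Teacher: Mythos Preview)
Your proposal is correct and follows essentially the same approach as the paper: invoke the third bullet of \cref{lem:lem_technical_lemma_pleasant_wb_cond_d} uniformly over $\mathsf{d}\notin\mathsf{D}_{\text{cycles}}^{H,\text{safe}}$, factor it out, and apply \cref{lem:lem_pleasant_prob_S_is_empty} to the remaining probability. The only cosmetic differences are that the paper keeps the $|E_{\geq 2}'''(H)|$ exponents until the end (using $\epsilon/2\le 1$ implicitly at the last step rather than up front), and handles the prefactor via $(16/\epsilon)^{st}\le(16/\epsilon)^{Ast}$ (using $A>1$) to get $3st\cdot(32/33)^{Ast}$, instead of your rearrangement $(32/33)^{st}(2\epsilon/33)^{(A-1)st}$; both routes comfortably give $\le 1/\sqrt{n}$ under $A>\max\{1,100/K\}$.
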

\begin{proof}
From \Cref{lem:lem_technical_lemma_pleasant_wb_cond_d}, we have
\begin{equation}
\label{eq:eq_lem_pleasant_expectation_S_is_empty_1}
\begin{aligned}
\sum_{\substack{\mathsf{d}\in\mathbb{N}^{V(H)}:\\\mathsf{d}\notin\mathsf{D}_{\text{cycles}}^{H,\text{safe}}}}& \left|\mathbb{E}\Big[\mathbb{E}\big[\overline{\mathbf{Y}}_H\big|\mathbf{x},\mathcal{E}_{wb,H},\mathbf{D}^H=\mathsf{d}\big]\cdot P_{1,H}^{wb}(\mathbf{x})\Big]\right|\cdot P_{wb}(\mathsf{d})\\
&\leq\sum_{\substack{\mathsf{d}\in\mathbb{N}^{V(H)}:\\\mathsf{d}\notin\mathsf{D}_{\text{cycles}}^{H,\text{safe}}}}\left(\frac{16}{\epsilon}\right)^{st}\cdot \left(\frac{\epsilon d}{2n}\right)^{|E_1(H)|+|E_{\geq 2}'''(H)|}\left(\frac{d}{n}\right)^{|E_{\geq 2}(H)|-|E_{\geq 2}'''(H)|}\cdot P_{wb}(\mathsf{d})\\
&=\sum_{\substack{\mathsf{d}\in\mathbb{N}^{V(H)}:\\\mathsf{d}\notin\mathsf{D}_{\text{cycles}}^{H,\text{safe}}}}\left(\frac{16}{\epsilon}\right)^{st}\cdot \left(\frac{\epsilon d}{2n}\right)^{|E_1(H)|+|E_{\geq 2}'''(H)|}\left(\frac{d}{n}\right)^{|E_{\geq 2}(H)|-|E_{\geq 2}'''(H)|}\mathbb{P}\left[D^H=\mathsf{d}\middle|\mathbf{x},\mathcal{E}_{wb,H}\right]\\
&=\left(\frac{16}{\epsilon}\right)^{st}\cdot \left(\frac{\epsilon d}{2n}\right)^{|E_1(H)|+|E_{\geq 2}'''(H)|}\left(\frac{d}{n}\right)^{|E_{\geq 2}(H)|-|E_{\geq 2}'''(H)|}\cdot \mathbb{P}\left[\mathbf{D}^H\notin \mathsf{D}_{\text{cycles}}^{H,\text{safe}}\middle|\mathbf{x},\mathcal{E}_{wb,H}\right]\\
&\leq \left(\frac{16}{\epsilon}\right)^{st}\cdot \left(\frac{\epsilon d}{2n}\right)^{|E_1(H)|+|E_{\geq 2}'''(H)|}\left(\frac{d}{n}\right)^{|E_{\geq 2}(H)|-|E_{\geq 2}'''(H)|}\cdot 3st\cdot \left(\frac{2\epsilon}{33}\right)^{A st},
\end{aligned}
\end{equation}
where the last inequality follows from \Cref{lem:lem_pleasant_prob_S_is_empty}. Now notice that

\begin{align*}
\left(\frac{16}{\epsilon}\right)^{st}\cdot 3st\cdot \left(\frac{2\epsilon}{33}\right)^{A st} &\stackrel{(\ast)}\leq \left(\frac{16}{\epsilon}\right)^{Ast}\cdot 3st\cdot \left(\frac{2\epsilon}{33}\right)^{A st}= 3st\cdot\left(\frac{32}{33}\right)^{Ast}\leq 3st\cdot\left(\frac{32}{33}\right)^{At}\\
&=3st\cdot\left(\frac{32}{33}\right)^{A K\log n}= \frac{3st}{n ^{A K\log \frac{33}{32}}}\leq \frac{3st}{n ^{A K\cdot\frac{1}{100}}}\stackrel{(\dagger)}\leq \frac{1}{\sqrt{n}},
\end{align*}

where $(\ast)$ and $(\dagger)$ are true if $A>\max\left\{1,\frac{100}{K}\right\}$ and $n$ is large enough. By combining this with \cref{eq:eq_lem_pleasant_expectation_S_is_empty_1}, we get the lemma.
\end{proof}

Now we are ready to prove \Cref{lem:lem_main_lemma_pleasant_wb}.

\begin{proof}[Proof of \Cref{lem:lem_main_lemma_pleasant_wb}]
From \Cref{lem:lem_nice_L_definition}, we have
\begin{align*}
\mathbb{E}\Big[\mathbb{E}\big[\overline{\mathbf{Y}}_H\big|\mathbf{x},\mathcal{E}_{wb,H}\big]\cdot P_{1,H}^{wb}(\mathbf{x})\Big]&=\sum_{\substack{\mathsf{d}\in\mathbb{N}^{V(H)}:\\L^H(\mathsf{d})=1}}\mathbb{E}\Big[\mathbb{E}\big[\overline{\mathbf{Y}}_H\big|\mathbf{x},\mathcal{E}_{wb,H},\mathbf{D}^H=\mathsf{d}\big]\cdot P_{1,H}^{wb}(\mathbf{x})\Big]\cdot P_{wb}(\mathsf{d}).
\end{align*}

Now for every $\mathsf{d}\in\mathbb{N}^{V(H)}$, define
\begin{align*}
L_s^H(\mathsf{d})&= \mathbbm{1}_{\{\forall v\in V(H),\;\mathsf{d}_v+ d_1^H(v)+d_{\geq 2}^H(v)\leq \Delta\}}.
\end{align*}

Note that $L_s^H(\mathsf{d})=1$ if and only if $S_{\mathsf{d}}^H=E_1(H)$. Note also that if $L_s^H(\mathsf{d})=1$, then we must have $\mathsf{d}\in\mathsf{D}_{\text{cycles}}^{H,\text{safe}}$. Therefore,

\begin{equation}
\label{eq:eq_lem_main_lemma_pleasant_wb_1}
\begin{aligned}
\mathbb{E}\Big[\mathbb{E}\big[\overline{\mathbf{Y}}_H\big|\mathbf{x},\mathcal{E}_{wb,H}\big]\cdot P_{1,H}^{wb}(\mathbf{x})\Big]&=\sum_{\substack{\mathsf{d}\in\mathbb{N}^{V(H)}:\\L_s^H(\mathsf{d})=1}}\mathbb{E}\Big[\mathbb{E}\big[\overline{\mathbf{Y}}_H\big|\mathbf{x},\mathcal{E}_{wb,H},\mathbf{D}^H=\mathsf{d}\big]\cdot P_{1,H}^{wb}(\mathbf{x})\Big]\cdot P_{wb}(\mathsf{d})\\
&\quad+\sum_{\substack{\mathsf{d}\in\mathsf{D}_{\text{cycles}}^{H,\text{safe}}:\\L_s^H(\mathsf{d})=0}}\mathbb{E}\Big[\mathbb{E}\big[\overline{\mathbf{Y}}_H\big|\mathbf{x},\mathcal{E}_{wb,H},\mathbf{D}^H=\mathsf{d}\big]\cdot P_{1,H}^{wb}(\mathbf{x})\Big]\cdot P_{wb}(\mathsf{d})\\
&\quad+\sum_{\substack{\mathsf{d}\in\mathbb{N}^{V(H)}:\\\mathsf{d}\notin\mathsf{D}_{\text{cycles}}^{H,\text{safe}}}}\mathbb{E}\Big[\mathbb{E}\big[\overline{\mathbf{Y}}_H\big|\mathbf{x},\mathcal{E}_{wb,H},\mathbf{D}^H=\mathsf{d}\big]\cdot P_{1,H}^{wb}(\mathbf{x})\Big]\cdot P_{wb}(\mathsf{d}).
\end{aligned}
\end{equation}

If $L_s^H(\mathsf{d})=1$, then $S_{\mathsf{d}}=E_1(H)$ and \Cref{lem:lem_technical_lemma_pleasant_wb_cond_d} implies that
\begin{equation}
\label{eq:eq_lem_main_lemma_pleasant_wb_2}
\begin{aligned}
&\left(1-\frac{1}{\sqrt{n}}\right)\cdot\left(\frac{\epsilon d}{2n}\right)^{|E_1(H)|+|E_{\geq 2}'''(H)|}\left(\frac{d}{n}\right)^{|E_{\geq 2}(H)|-|E_{\geq 2}'''(H)|}\\
&\;\;\leq \mathbb{E}\Big[\mathbb{E}\big[\overline{\mathbf{Y}}_H\big|\mathbf{x},\mathcal{E}_{wb,H},\mathbf{D}^H=\mathsf{d}\big]\cdot P_{1,H}^{wb}(\mathbf{x})\Big]\leq \left(\frac{\epsilon d}{2n}\right)^{|E_1(H)|+|E_{\geq 2}'''(H)|}\left(\frac{d}{n}\right)^{|E_{\geq 2}(H)|-|E_{\geq 2}'''(H)|}.
\end{aligned}
\end{equation}

On the other hand, if $\mathsf{d}\in\mathsf{D}_{\text{cycles}}^{H,\text{safe}}$ and $L_s^H(\mathsf{d})=0$, then \Cref{lem:lem_technical_lemma_pleasant_wb_cond_d} implies that for $n$ large enough, we have
\begin{equation}
\label{eq:eq_lem_main_lemma_pleasant_wb_3}
\begin{aligned}
\resizebox{0.9\textwidth}{!}{$\displaystyle 0\leq\mathbb{E}\Big[\mathbb{E}\big[\overline{\mathbf{Y}}_H\big|\mathbf{x},\mathcal{E}_{wb,H},\mathbf{D}^H=\mathsf{d}\big]\cdot P_{1,H}^{wb}(\mathbf{x})\Big]\leq \frac{1}{\sqrt{n}}\cdot\left(\frac{\epsilon d}{2n}\right)^{|E_1(H)|+|E_{\geq 2}'''(H)|}\left(\frac{d}{n}\right)^{|E_{\geq 2}(H)|-|E_{\geq 2}'''(H)|}.$}
\end{aligned}
\end{equation}

By combining \cref{eq:eq_lem_main_lemma_pleasant_wb_1}, \cref{eq:eq_lem_main_lemma_pleasant_wb_2},\cref{eq:eq_lem_main_lemma_pleasant_wb_3} and \Cref{lem:lem_pleasant_expectation_S_is_empty}, we get

\begin{align*}
&\mathbb{E}\Big[\mathbb{E}\big[\overline{\mathbf{Y}}_H\big|\mathbf{x},\mathcal{E}_{wb,H}\big]\cdot P_{1,H}^{wb}(\mathbf{x})\Big]\\
&\leq \sum_{\substack{\mathsf{d}\in\mathbb{N}^{V(H)}:\\L_s^H(\mathsf{d})=1}}\left(\frac{\epsilon d}{2n}\right)^{|E_1(H)|+|E_{\geq 2}'''(H)|}\left(\frac{d}{n}\right)^{|E_{\geq 2}(H)|-|E_{\geq 2}'''(H)|}\cdot P_{wb}(\mathsf{d})\\
&\quad\quad+\sum_{\substack{\mathsf{d}\in\mathsf{D}_{\text{cycles}}^{H,\text{safe}}:\\L_s^H(\mathsf{d})=0}}\frac{1}{\sqrt{n}}\cdot\left(\frac{\epsilon d}{2n}\right)^{|E_1(H)|+|E_{\geq 2}'''(H)|}\left(\frac{d}{n}\right)^{|E_{\geq 2}(H)|-|E_{\geq 2}'''(H)|}\cdot P_{wb}(\mathsf{d})\\
&\quad\quad+\frac{1}{\sqrt{n}}\cdot\left(\frac{\epsilon d}{2n}\right)^{|E_1(H)|+|E_{\geq 2}'''(H)|}\left(\frac{d}{n}\right)^{|E_{\geq 2}(H)|-|E_{\geq 2}'''(H)|}.
\end{align*}

Now from \Cref{lem:lem_prob_nice_d} we have $P_{wb}(\mathsf{d})=\mathbb{P}\big[\mathbf{D}^H=\mathsf{d}\big|\mathcal{E}_{H,b}\big]$. Therefore,
\begin{align*}
&\mathbb{E}\Big[\mathbb{E}\big[\overline{\mathbf{Y}}_H\big|\mathbf{x},\mathcal{E}_{H,b}\big]\cdot P_{1,H}^{wb}(\mathbf{x})\Big]\\
&\leq \left(\frac{\epsilon d}{2n}\right)^{|E_1(H)|+|E_{\geq 2}'''(H)|}\left(\frac{d}{n}\right)^{|E_{\geq 2}(H)|-|E_{\geq 2}'''(H)|}\cdot \mathbb{P}\big[L_s^H(\mathbf{D}^H)=1\big|\mathcal{E}_{H,b}\big]\\
&\quad\quad+\resizebox{0.85\textwidth}{!}{$\displaystyle\frac{1}{\sqrt{n}}\cdot\left(\frac{\epsilon d}{2n}\right)^{|E_1(H)|+|E_{\geq 2}'''(H)|}\left(\frac{d}{n}\right)^{|E_{\geq 2}(H)|-|E_{\geq 2}'''(H)|}\cdot \mathbb{P}\big[\mathbf{D}^H\in \mathsf{D}_{\text{cycles}}^{H,\text{safe}}\text{ and }L_s^H(\mathbf{D}^H)=0\big|\mathcal{E}_{H,b}\big]$}\\
&\quad\quad+\frac{1}{\sqrt{n}}\cdot\left(\frac{\epsilon d}{2n}\right)^{|E_1(H)|+|E_{\geq 2}'''(H)|}\left(\frac{d}{n}\right)^{|E_{\geq 2}(H)|-|E_{\geq 2}'''(H)|}\\
&\leq \left(\mathbb{P}\big[L_s^H(\mathbf{D}^H)=1\big|\mathcal{E}_{H,b}\big]+\frac{2}{\sqrt{n}}\right) \left(\frac{\epsilon d}{2n}\right)^{|E_1(H)|+|E_{\geq 2}'''(H)|}\left(\frac{d}{n}\right)^{|E_{\geq 2}(H)|-|E_{\geq 2}'''(H)|}.
\end{align*}

Now recall from \Cref{lem:lem_prob_nice_d} that $\mathbb{P}\big[\mathbf{D}^H=\mathsf{d}\big|\mathcal{E}_{H,b}\big]=\mathbb{P}\big[\mathbf{D}^H=\mathsf{d}\big|\mathcal{E}_{H,b}\big]$. Therefore,
\begin{equation}
\label{eq:eq_lem_main_lemma_pleasant_wb_4}
\begin{aligned}
\mathbb{P}\big[L_s^H(\mathbf{D}^H)=1\big|\mathcal{E}_{H,b}\big]&=\mathbb{P}\big[\big\{\forall v\in V(H),\;\mathbf{D}^H_v+ d_1^H(v)+d_{\geq 2}^H(v)\leq \Delta\big\}\big|\mathcal{E}_{H,b}\big]\\
&=P_s^H.
\end{aligned}
\end{equation}

Similarly, from \cref{eq:eq_lem_main_lemma_pleasant_wb_1}, \cref{eq:eq_lem_main_lemma_pleasant_wb_2},\cref{eq:eq_lem_main_lemma_pleasant_wb_3}, \cref{eq:eq_lem_main_lemma_pleasant_wb_4} and \Cref{lem:lem_pleasant_expectation_S_is_empty}, we have

\begin{align*}
&\mathbb{E}\Big[\mathbb{E}\big[\overline{\mathbf{Y}}_H\big|\mathbf{x},\mathcal{E}_{wb,H}\big]\cdot P_{1,H}^{wb}(\mathbf{x})\Big]\\
&\geq \left(\sum_{\substack{\mathsf{d}\in\mathbb{N}^{V(H)}:\\L_s^H(\mathsf{d})=1}}\left(1-\frac{1}{\sqrt{n}}\right)\left(\frac{\epsilon d}{2n}\right)^{|E_1(H)|+|E_{\geq 2}'''(H)|}\left(\frac{d}{n}\right)^{|E_{\geq 2}(H)|-|E_{\geq 2}'''(H)|}\cdot P_{wb}(\mathsf{d})\right)\\
&\quad\quad\quad\quad\quad\quad\quad-\frac{1}{\sqrt{n}}\cdot\left(\frac{\epsilon d}{2n}\right)^{|E_1(H)|+|E_{\geq 2}'''(H)|}\left(\frac{d}{n}\right)^{|E_{\geq 2}(H)|-|E_{\geq 2}'''(H)|}\\
&=\left[\left(1-\frac{1}{\sqrt{n}}\right)\cdot\mathbb{P}\big[L_s^H(\mathbf{D}^H)=1\big|\mathcal{E}_{H,b}\big]-\frac{1}{\sqrt{n}}\right]\cdot\left(\frac{\epsilon d}{2n}\right)^{|E_1(H)|+|E_{\geq 2}'''(H)|}\left(\frac{d}{n}\right)^{|E_{\geq 2}(H)|-|E_{\geq 2}'''(H)|}\\
&\geq\left(P_s^H-\frac{2}{\sqrt{n}}\right)\cdot\left(\frac{\epsilon d}{2n}\right)^{|E_1(H)|+|E_{\geq 2}'''(H)|}\left(\frac{d}{n}\right)^{|E_{\geq 2}(H)|-|E_{\geq 2}'''(H)|}.
\end{align*}
\end{proof}

\subsubsection{Probability of safety in pleasant multigraphs}
\label{subsec:subsec_prob_safety_pleasant}

\begin{proof}[Proof of \Cref{lem:lem_Prob_safety_pleasant_wb}]
Let $H$ be an $(s,t)$-pleasant multigraph with $t=K\cdot\log n$. From \Cref{lem:lem_prob_nice_d}, we know that given $\mathcal{E}_{H,b}$, the random variables $(\mathbf{D}_v^H)_{v\in V(H)}$ are conditionally mutually independent, hence we can rewrite $P_s^H$ as follows:
\begin{align*}
P_s^H
&= \mathbb{P}\big[\big\{\forall v\in V(H),\;\mathbf{D}^H_v+ d_1^H(v)+d_{\geq 2}^H(v)\leq \Delta\big\}\big|\mathcal{E}_{H,b}\big]\\
&= \prod_{v\in V(H)}\mathbb{P}\big[\mathbf{D}^H_v+ d_1^H(v)+d_{\geq 2}^H(v)\leq \Delta\big|\mathcal{E}_{H,b}\big].
\end{align*}

Now for every $v\in V(H)$, we have:
\begin{align*}
\mathbb{P}\big[\mathbf{D}^H_v+ d_1^H(v)+d_{\geq 2}^H(v)\leq \Delta\big|\mathcal{E}_{H,b}\big]
&=\mathbb{P}\big[\mathbf{D}^H_v\leq \Delta -d_1^H(v)-d_{\geq 2}^H(v)\big|\mathcal{E}_{H,b}\big]\\
&=\sum_{\substack{\mathsf{d}_v\in\mathbb{N}:\\ \mathsf{d}_v\leq \Delta- d_1^H(v)-d_{\geq 2}^H(v)}}\mathbb{P}\big[\mathbf{D}^H_v =\mathsf{d}_v\big|\mathcal{E}_{H,b}\big]\\
&=P_s\big(\Delta- d_1^H(v)-d_{\geq 2}^H(v)\big),
\end{align*}
where $P_s(\ell)$ is defined for every $\ell\in\mathbb{Z}$ as
$$P_s(\ell)=\sum_{\substack{\mathsf{d}_v\in\mathbb{N}:\\ \mathsf{d}_v\leq \mathsf{d}}}P_{wb}(\mathsf{d}_v),$$
and $P_{wb}(\mathsf{d}_v)$ is as in \Cref{lem:lem_prob_nice_d}. Clearly, $P_s$ is a non-decreasing function.

If $\ell<0$, then by picking an arbitrary $v\in V(H)$, we get
$$P_s(\ell)=\mathbb{P}\big[\mathbf{D}^H_v\leq \ell\big|\mathcal{E}_{H,b}\big]=0.$$
On the other hand, if $\ell\geq 0$, we have
\begin{equation}
\label{eq:eq_lem_Prob_safety_pleasant_wb_1}
\begin{aligned}
P_s(\ell)&=\mathbb{P}\big[\mathbf{D}^H_v\leq \ell\big|\mathcal{E}_{H,b}\big]\stackrel{(\ast)}= \mathbb{P}\big[\mathbf{D}^H_v\leq \ell\big|\mathbf{x},\mathcal{E}_{H,b}\big]\\
&\stackrel{(\dagger)}{\geq} \mathbb{P}\big[d^o_{\mathbf{G}-G(H)}(v)\leq \ell\big|\mathbf{x},\mathcal{E}_{H,b}\big]\stackrel{(\ddagger)}= \mathbb{P}\big[d^o_{\mathbf{G}-G(H)}(v)\leq \ell\big|\mathbf{x}\big],
\end{aligned}
\end{equation}
where $(\ast)$ follows from the fact that given $\mathcal{E}_{wb,H}$, the random variable $\mathbf{D}^H$ is conditionally independent from $\mathbf{x}$ (see \Cref{lem:lem_prob_nice_d}). $(\dagger)$ follows from the fact that $\mathbf{D}_v^H\leq d_{\mathbf{G}-G(H)}^o(v)$ for every $v\in V(H)$. $(\ddagger)$ follows from the fact that the event $\mathcal{E}_{H,b}$ is $\sigma(\mathbf{x})$-measurable. For every $\ell\geq 0$, we can further lower bound $P_s(\ell)$ as follows:
\begin{align*}
P_s(\ell)&\geq \mathbb{P}\big[d^o_{\mathbf{G}-G(H)}(v)\leq \ell\big|\mathbf{x}\big]\geq \mathbb{P}\left[d^o_{\mathbf{G}-G(H)}(v)=0\middle|\mathbf{x}\right]=\prod_{u\in [n]\setminus V(H)}\mathbb{P}\big[uv\notin \mathbf{G}\big|\mathbf{x}\big]\\
&=\prod_{u\in [n]\setminus V(H)}\left[1-\left(1+\frac{\epsilon\mathbf{x}_u\mathbf{x}_v}{2}\right)\frac{d}{n}\right]\geq \left(1-\frac{2d}{n}\right)^{n-|V(H)|}\geq \left(1-\frac{2d}{n}\right)^n\\
&= \left(\frac{1}{1+\frac{2d}{n-2d}}\right)^n\stackrel{(\star)}{\geq} \left(\frac{1}{1+\frac{4d}{n}}\right)^n \geq e^{-4d},
\end{align*}
where $(\star)$ is true for $n$ large enough.

Now if $\ell\geq\frac{\Delta}{4}$, we get from \cref{eq:eq_lem_Prob_safety_pleasant_wb_1} and \Cref{lem:lem_bound_prob_large_outside_deg} that
$$P_s(\ell)\geq \mathbb{P}\big[d^o_{\mathbf{G}-G(H)}\leq \ell\big|\mathbf{x}\big]\geq \mathbb{P}\left[d^o_{\mathbf{G}-G(H)}\leq \frac{\Delta}{4}\middle|\mathbf{x}\right]\geq 1-\frac{\eta}{2}.$$
\end{proof}

\begin{proof}[Proof of \Cref{lem:lem_Prob_nice_safe_lower_bound}]
From \Cref{lem:lem_Prob_safety_pleasant_wb}, we have
\begin{align*}
P_s^H&=\prod_{v\in V(H)}P_s\big(\Delta- d_1^H(v)-d_{\geq 2}^H(v)\big).
\end{align*}

If there exists $v\in V(H)$ such that $d_{1}^H(v)+d_{\geq 2}^H(v)> \Delta$, then $P_s\big(\Delta- d_1^H(v)-d_{\geq 2}^H(v)\big)=0$ and $P_s^H=0$.

Now assume that $d_{1}^H(v)+d_{\geq 2}^H(v)\leq\Delta$ for every $v\in V(H)$. Since $H$ is $(s,t)$-pleasant, we have $\mathcal{L}_{\geq 2}(H)=\varnothing$, so $V(H)=\mathcal{S}_{\geq2}(H)\cup\mathcal{I}_{\geq2}(H)$.  Notice the following:
\begin{itemize}
\item If $v\in\mathcal{S}_{\geq 2}(H)$, then $d_{\geq 2}^H(v)\leq \frac{\Delta}{4}$. Now since $H$ is $(s,t)$-pleasant, we have $d_1^H(v)\leq 4$, and so $$\Delta- d_1^H(v)-d_{\geq 2}^H(v)\geq \Delta-4-\frac{\Delta}{4}\geq\frac{\Delta}{2}.$$
\Cref{lem:lem_Prob_safety_pleasant_wb} and \Cref{lem:lem_bound_prob_large_outside_deg} now imply that
\begin{equation}
\label{eq:eq_lem_Prob_nice_safe_lower_bound_1}
\begin{aligned}
P_s\big(\Delta- d_1^H(v)-d_{\geq 2}^H(v)\big)&\geq 1-\frac{\eta}{2}\geq 1-\frac{1}{2As}\left(\frac{\epsilon}{6}\right)^{\tau}\\
&\geq 1-\frac{1}{2As}=\frac{1}{1+\frac{1}{2As-1}}\stackrel{(\ast)}{\geq} \frac{1}{1+\frac{1}{As}},
\end{aligned}
\end{equation}
where $(\ast)$ is true if $A>1$.
\item If $v\in\mathcal{I}_{\geq 2}(H)$, we will use the assumption $\Delta- d_1^H(v)-d_{\geq 2}^H(v)\geq 0$. \Cref{lem:lem_Prob_safety_pleasant_wb} now implies that
\begin{equation}
\label{eq:eq_lem_Prob_nice_safe_lower_bound_2}
\begin{aligned}
P_s\big(\Delta- d_1^H(v)-d_{\geq 2}^H(v)\big)\geq e^{-4d}.
\end{aligned}
\end{equation}
\end{itemize}
Now From \Cref{lem:lem_Prob_safety_pleasant_wb}, we have
\begin{align*}
P_s^H&=\prod_{v\in V(H)}P_s\big(\Delta- d_1^H(v)-d_{\geq 2}^H(v)\big)\\
&=\left(\prod_{v\in \mathcal{S}_{\geq2}(H)}P_s\big(\Delta- d_1^H(v)-d_{\geq 2}^H(v)\big)\right)\cdot \left(\prod_{v\in \mathcal{I}_{\geq2}(H)}P_s\big(\Delta- d_1^H(v)-d_{\geq 2}^H(v)\big)\right)\\
&\stackrel{(\dagger)}{\geq} \frac{1}{\left(1+\frac{1}{As}\right)^{|\mathcal{S}_{\geq2}(H)|}}\cdot e^{-4d\cdot|\mathcal{I}_{\geq2}(H)|}\stackrel{(\ddagger)}{\geq} \frac{1}{\left(1+\frac{1}{As}\right)^{st}} \cdot e^{-4d\cdot\frac{t}{4Ad}}\geq \frac{1}{e^{\frac{t}{A}}}\cdot e^{-\frac{t}{A}}\\
&=\frac{1}{e^{\frac{2t}{A}}}=\frac{1}{e^{\frac{2K\log n}{A}}}=\frac{1}{n^{\frac{2K}{A}}},
\end{align*}
where $(\dagger)$ follows from \cref{eq:eq_lem_Prob_nice_safe_lower_bound_1} and \cref{eq:eq_lem_Prob_nice_safe_lower_bound_2}. $(\ddagger)$ follows from \cref{eq:eq_Bound_on_I2_and_L2} and \cref{eq:eq_Delta_form}, which imply that $|\mathcal{I}_{\geq2}(H)|\leq \frac{8st}{\Delta}\leq\frac{8st}{40Asd}\leq \frac{t}{4Ad}$.
\end{proof}

\subsection{Proofs of technical lemmas for the centered matrix}

\subsubsection{Analyzing walks of multiplicity 1}
\label{subsubsec:subsubsec_centered_walks_mult_1}

In order to prove \Cref{lem:lem_X_hat_mult_1}, we need two lemmas. The following lemma shows that it is unlikely that the set of reassuring walks is completely walk-unsafe.

\begin{lemma}
\label{lem:lem_bound_prob_walks_completely_unsafe}
If $A>\max\{100K,1\}$ and $n$ is large enough, then given $\mathbf{x}$, the conditional probability that $\mathcal{W}_{1r}(H)$ is completely walk-unsafe can be upper bounded by:
$$\mathbb{P}\left[\left\{\mathcal{W}_{1r}(H)\text{ is completely walk-unsafe}\right\}\middle|\mathbf{x}\right]\leq (s\eta)^{\frac{1}{2s\tau}|\mathcal{W}_{1r}(H)|},$$
where $\eta$ is as in \cref{eq:eq_def_eta}.
\end{lemma}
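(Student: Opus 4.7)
The plan is to establish a combinatorial bound on how many reassuring walks can pass through any single vertex, then reduce the event that $\mathcal{W}_{1r}(H)$ is completely walk-unsafe to the existence of a large completely-unsafe vertex set, and finally combine a hitting-set argument with a union bound applying \Cref{lem:lem_bound_prob_completely_unsafe}.

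First I would prove the structural observation that each vertex $v \in V(H)$ belongs to at most $d_1^H(v)$ reassuring walks. This is because every edge of a reassuring walk has multiplicity $1$ in $H$, and each multiplicity-$1$ edge appears exactly once in the concatenated edge-sequence $M(\mathcal{W}(H))$; hence any reassuring walk containing $v$ must use one of the $d_1^H(v)$ multiplicity-$1$ edges incident to $v$. For $v \in V(\mathcal{W}_{1r}(H)) \subseteq \mathcal{S}_1(H)\cap\mathcal{S}_{\geq 2}(H)$, this gives the uniform bound $d_1^H(v)\le \tau$.

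Next, writing $w:=|\mathcal{W}_{1r}(H)|$, I would observe that the event $\{\mathcal{W}_{1r}(H) \text{ is completely walk-unsafe}\}$ means that every reassuring walk contains at least one unsafe vertex, so the set of unsafe vertices inside $V(\mathcal{W}_{1r}(H))$ is a hitting set for $\mathcal{W}_{1r}(H)$. By double counting (each vertex lies in at most $\tau$ reassuring walks), any such hitting set has size at least $w/\tau$. Union-bounding over all candidate $U\subseteq V(\mathcal{W}_{1r}(H))$ of size $\lceil w/\tau\rceil$, using $|V(\mathcal{W}_{1r}(H))|\le (s+1)w\le 2sw$ and invoking \Cref{lem:lem_bound_prob_completely_unsafe}, gives
\begin{equation*}
\mathbb{P}\!\left[\mathcal{W}_{1r}(H)\text{ completely walk-unsafe}\mid\mathbf{x}\right] \le \binom{2sw}{\lceil w/\tau\rceil}\,\eta^{\lceil w/\tau\rceil} \le (2es\tau\,\eta)^{w/\tau}.
\end{equation*}

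The last step is to pass from this intermediate bound to the exact form $(s\eta)^{w/(2s\tau)}$ demanded by the statement. Taking logarithms, this reduces to the scalar inequality $(2es\tau)^{2s}\eta^{2s-1}\le s$, which follows from the doubly-exponential smallness of $\eta$ in $s$ and $\tau$ (see \cref{eq:eq_def_eta}); one checks that with $\tau = A s\log(6/\epsilon)$ and $A>1$, the factor $(\epsilon/6)^{4s^2\tau^2}$ dominates the polynomial factors $(2es\tau)^{2s}$. I expect the main obstacle to be the bookkeeping of constants and exponents — in particular, making sure the $(s+1)$-vs-$2s$ vertex count, the binomial estimate, and the $\eta$-smallness inequality all fit together to yield exactly the stated $(s\eta)^{w/(2s\tau)}$ bound, rather than a marginally weaker variant.
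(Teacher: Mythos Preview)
Your argument is correct, but it takes a different route from the paper's. The paper does not use a hitting-set bound; instead it runs a greedy packing argument: since each walk in $\mathcal{W}_{1r}(H)$ has $s+1$ vertices, each of $1$-degree at most $\tau$, any single reassuring walk shares a vertex with at most $(s+1)(\tau-1)$ others, so one can extract a vertex-disjoint subfamily $\mathcal{W}\subseteq\mathcal{W}_{1r}(H)$ of size $|\mathcal{W}|\ge |\mathcal{W}_{1r}(H)|/(2s\tau)$. If $\mathcal{W}_{1r}(H)$ is completely walk-unsafe, each walk in $\mathcal{W}$ contributes a distinct unsafe vertex; the union bound then runs only over the $(s+1)^{|\mathcal{W}|}$ transversals of $\mathcal{W}$, and \Cref{lem:lem_bound_prob_completely_unsafe} gives $(s\eta)^{|\mathcal{W}|}\le (s\eta)^{|\mathcal{W}_{1r}(H)|/(2s\tau)}$ directly, with no parameter massage at the end.

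The trade-off is this: your covering approach yields a larger $\eta$-exponent (namely $w/\tau$ instead of $w/(2s\tau)$), but pays for it with the binomial prefactor $\binom{2sw}{\lceil w/\tau\rceil}$, which forces the final scalar inequality $(2es\tau)^{2s}\eta^{2s-1}\le s$ that you correctly identify as the remaining bookkeeping. The paper's packing approach sidesteps that computation entirely, landing on the stated bound in one line. Both are valid; the packing route is cleaner here because the target exponent $w/(2s\tau)$ matches exactly the size of the disjoint subfamily, so no comparison of intermediate bounds is needed.
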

\begin{proof}
Since every $W\in \mathcal{W}_{1r}(H)$ contains $s+1$ vertices, and since $d^H_{1}(v)\leq\tau$ for every vertex $v\in V(W)$, we can see that $W\in\mathcal{W}_{1r}(H)$ intersects at most $(s+1)(\tau-1)$ others walks in $\mathcal{W}_{1r}(H)$. Therefore, we can find a subset $\mathcal{W}\subseteq\mathcal{W}_{1r}(H)$ such that:
\begin{itemize}
\item For every $W_1,W_2\in\mathcal{W}$, we have $V(W_1)\cap V(W_2)=\varnothing$.
\item $|\mathcal{W}|\geq \frac{1}{(s+1)(\tau-1)+1}|\mathcal{W}_{1r}(H)|\geq \frac{1}{2s\tau}|\mathcal{W}_{1r}(H)|$.
\end{itemize}

Now if $\mathcal{W}_{1r}(H)$ is completely walk-unsafe, then $\mathcal{W}$ is completely walk-unsafe, and so every walk $W\in\mathcal{W}$ is walk-unsafe. Therefore, for every $W\in\mathcal{W}$, the set $V(W)$ is not completely safe, which means that there exists at least one vertex $v_W\in V(W)$ which is unsafe. Now since $V(W_1)\cap V(W_2)=\varnothing$ for every $W_1,W_2\in\mathcal{W}$, we have $v_{W_1}\neq v_{W_2}$ for every $W_1,W_2\in\mathcal{W}$. Therefore, the set $V_{\mathcal{W}}=\big\{v_W:\; W\in\mathcal{W}\big\}\subseteq \bigcup_{W\in\mathcal{W}_{1r}}V(W)$ satisfies the following:
\begin{itemize}
\item $|V_{\mathcal{W}}|=|\mathcal{W}|\geq \frac{1}{2s\tau}|\mathcal{W}_{1r}(H)|$.
\item $|V_{\mathcal{W}}\cap V(W)|=1$ for every $W\in\mathcal{W}$.
\item $V_{\mathcal{W}}$ is completely unsafe.
\end{itemize}

Now since $V(W)\subseteq \mathcal{S}_1(H)\cap\mathcal{S}_{\geq 2}(H)$ for every $W\in\mathcal{W}_{1r}(H)$, it follows from \Cref{lem:lem_bound_prob_completely_unsafe} that for every  $\displaystyle V\subseteq \bigcup_{W\in\mathcal{W}_{1r}}V(W)$ satisfying $|V|=|\mathcal{W}|$, we have
$$\mathbb{P}\left[\left\{V\text{ is completely unsafe}\right\}\middle|\mathbf{x}\right]\leq \eta^{|V|}=\eta^{|\mathcal{W}|}.$$

On the other hand, since there are $s^{|\mathcal{W}|}$ subsets $V\subseteq \bigcup_{W\in\mathcal{W}_{1r}}V(W)$ which satisfy $|V|=|\mathcal{W}|$ and $|V_{\mathcal{W}}\cap V(W)|=1$ for every $W\in\mathcal{W}$, we conclude that
\begin{align*}
\mathbb{P}\left[\left\{\mathcal{W}_{1r}(H)\text{ is completely walk-unsafe}\right\}\middle|\mathbf{x}\right]&\leq s^{|\mathcal{W}|}\cdot\eta^{|\mathcal{W}|}= (s\eta)^{|\mathcal{W}|}\leq  (s\eta)^{\frac{1}{2s\tau}|\mathcal{W}_{1r}(H)|},
\end{align*}
where the last inequality follows from the fact that if $A>\max\{100K,1\}$ then $s\eta<1$ (see the definition of $\eta$ in \cref{eq:eq_def_eta}).
\end{proof}

The following lemma shows that in the event that there is one walk of multiplicity 1 that is walk-safe, the conditional expectation of $\hat{\mathbf{Y}}_H$ given this event and given $\mathbf{x}$ will be zero. This can be seen as the truncated version of \cref{eq:eq_walk_mult_1_non_truncated}.

\begin{lemma}
\label{lem:lem_expectation_safe_walk}
Let $W\in \mathcal{W}_1(H)$, and let $\mathcal{E}$ be an event satisfying:
\begin{itemize}
\item $\mathcal{E}$ implies that $W$ is $(\mathbf{G},H)$-walk-safe, i.e., $\forall G\in\mathcal{E}$, the walk $W$ is $(G,H)$-walk-safe.
\item Given $\mathbf{x}$ and $\mathcal{E}$, the random variable $(\mathbbm{1}_{\{uv\in\mathbf{G}\}})_{uv\in E(W)}$ is conditionally independent from $(\overline{\mathbf{Y}}_{u'v'})_{u'v'\in E(H)\setminus E(W)}$.
\item Given $\mathbf{x}$, the event $\mathcal{E}$ is conditionally independent of $(\mathbbm{1}_{\{uv\in\mathbf{G}\}})_{uv\in E(W)}$.
\end{itemize}
We have
 \begin{equation*}
\mathbb{E}\big[\hat{\mathbf{Y}}_H\big|\mathbf{x},\mathcal{E}\big]=0.
\end{equation*}
\end{lemma}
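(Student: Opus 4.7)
The plan is to reduce the statement to the non-truncated identity \cref{lem:lem_expectation_walk_multiplicity_1}, namely $\mathbb{E}[\mathbf{Y}_W \mid \mathbf{x}] = (\epsilon d/(2n))^s \mathbf{x}_W$, by exploiting the three hypotheses on $\mathcal{E}$ to (i) replace $\overline{\mathbf{Y}}_W$ by $\mathbf{Y}_W$, (ii) factor the expectation over $\mathcal{W}(H)$ into the factor corresponding to $W$ times the rest, and (iii) drop the conditioning on $\mathcal{E}$ in the $W$-factor. The key structural observation enabling (ii) is that $W \in \mathcal{W}_1(H)$, so every edge of $E(W)$ has multiplicity $1$ in $H$; hence none of these edges appear in any other walk $W' \in \mathcal{W}(H)\setminus\{W\}$, so each $\overline{\mathbf{Y}}_{W'}$ is a function of $(\overline{\mathbf{Y}}_{u'v'})_{u'v' \in E(H)\setminus E(W)}$ only.

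Concretely, I would write
\begin{equation*}
\hat{\mathbf{Y}}_H = \left(\overline{\mathbf{Y}}_W - \left(\tfrac{\epsilon d}{2n}\right)^s \mathbf{x}_W\right)\cdot R_W, \qquad R_W := \prod_{W' \in \mathcal{W}(H)\setminus\{W\}}\left(\overline{\mathbf{Y}}_{W'} - \left(\tfrac{\epsilon d}{2n}\right)^s \mathbf{x}_{W'}\right).
\end{equation*}
The first hypothesis gives $\overline{\mathbf{Y}}_{uv} = \mathbf{Y}_{uv}$ for each $uv \in E(W)$ on the event $\mathcal{E}$, so $\overline{\mathbf{Y}}_W = \mathbf{Y}_W$ almost surely under $\mathcal{E}$, and in particular the first factor equals $\mathbf{Y}_W - (\epsilon d/(2n))^s \mathbf{x}_W$, which is a measurable function of $(\mathbbm{1}_{\{uv \in \mathbf{G}\}})_{uv \in E(W)}$ and $\mathbf{x}$. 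The second hypothesis then yields that the first factor is conditionally independent of $R_W$ given $(\mathbf{x},\mathcal{E})$, so the expectation factors:
\begin{equation*}
\mathbb{E}\big[\hat{\mathbf{Y}}_H \,\big|\, \mathbf{x},\mathcal{E}\big] = \mathbb{E}\big[\mathbf{Y}_W - (\tfrac{\epsilon d}{2n})^s \mathbf{x}_W \,\big|\, \mathbf{x},\mathcal{E}\big]\cdot \mathbb{E}[R_W \mid \mathbf{x},\mathcal{E}].
\end{equation*}

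The third hypothesis says that, given $\mathbf{x}$, the event $\mathcal{E}$ is independent of the edge indicators on $E(W)$, so
\begin{equation*}
\mathbb{E}\big[\mathbf{Y}_W \,\big|\, \mathbf{x},\mathcal{E}\big] = \mathbb{E}[\mathbf{Y}_W \mid \mathbf{x}] = \left(\tfrac{\epsilon d}{2n}\right)^s \mathbf{x}_W,
\end{equation*}
where the last equality is exactly \cref{lem:lem_expectation_walk_multiplicity_1}. Hence the $W$-factor vanishes and $\mathbb{E}[\hat{\mathbf{Y}}_H \mid \mathbf{x},\mathcal{E}] = 0$, as required.

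There is no real obstacle here: the statement is a direct truncated analogue of the non-truncated factorization \cref{eq:eq_walk_mult_1_non_truncated}, and all three hypotheses are tailored precisely so that the computation goes through. The only care needed is bookkeeping, specifically making sure that multiplicity $1$ of every edge in $W$ (so edges of $E(W)$ do not recur in any $W'$) is invoked when arguing that $R_W$ is a function of $(\overline{\mathbf{Y}}_{u'v'})_{u'v' \notin E(W)}$, which is what hypothesis (ii) requires for the factorization step.
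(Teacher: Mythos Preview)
Your proposal is correct and follows essentially the same approach as the paper: factor $\hat{\mathbf{Y}}_H$ into the $W$-term times the rest, use walk-safeness to replace $\overline{\mathbf{Y}}_W$ by $\mathbf{Y}_W$, invoke the second hypothesis to split the expectation, the third to drop the conditioning on $\mathcal{E}$ in the $W$-factor, and then apply \cref{lem:lem_expectation_walk_multiplicity_1}. Your added remark that $W\in\mathcal{W}_1(H)$ forces $E(W)\cap E(W')=\varnothing$ for $W'\neq W$ (so $R_W$ depends only on $(\overline{\mathbf{Y}}_{u'v'})_{u'v'\in E(H)\setminus E(W)}$) is a helpful clarification that the paper leaves implicit.
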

\begin{proof}
We have
\begin{align*}
\mathbb{E}\big[\hat{\mathbf{Y}}_H\big|\mathbf{x},\mathcal{E}\big]&=\mathbb{E}\left[\left(\overline{\mathbf{Y}}_W-\left(\frac{\epsilon d}{2n}\right)^s \mathbf{x}_W\right)\cdot \prod_{W'\in\mathcal{W}(H)\setminus\{W\}} \left(\overline{\mathbf{Y}}_{W'}-\left(\frac{\epsilon d}{2n}\right)^s \mathbf{x}_{W'}\right)\middle|\mathbf{x},\mathcal{E}\right]\\
&\stackrel{(\ast)}{=}\mathbb{E}\left[\left(\mathbf{Y}_W-\left(\frac{\epsilon d}{2n}\right)^s \mathbf{x}_W\right)\cdot \prod_{W'\in\mathcal{W}(H)\setminus\{W\}} \left(\overline{\mathbf{Y}}_{W'}-\left(\frac{\epsilon d}{2n}\right)^s \mathbf{x}_{W'}\right)\middle|\mathbf{x},\mathcal{E}\right]\\
&\stackrel{(\dagger)}{=}\mathbb{E}\left[\left(\mathbf{Y}_W-\left(\frac{\epsilon d}{2n}\right)^s \mathbf{x}_W\right)\middle|\mathbf{x},\mathcal{E}\right]\cdot \mathbb{E}\left[\prod_{W'\in\mathcal{W}(H)\setminus\{W\}} \left(\overline{\mathbf{Y}}_{W'}-\left(\frac{\epsilon d}{2n}\right)^s \mathbf{x}_{W'}\right)\middle|\mathbf{x},\mathcal{E}\right]\\
&\stackrel{(\ddagger)}{=}\mathbb{E}\left[\left(\mathbf{Y}_W-\left(\frac{\epsilon d}{2n}\right)^s \mathbf{x}_W\right)\middle|\mathbf{x}\right]\cdot \mathbb{E}\left[\prod_{W'\in\mathcal{W}(H)\setminus\{W\}} \left(\overline{\mathbf{Y}}_{W'}-\left(\frac{\epsilon d}{2n}\right)^s \mathbf{x}_{W'}\right)\middle|\mathbf{x},\mathcal{E}\right]\\
&\stackrel{(\wr)}{=}0,\\
\end{align*}
where $(\ast)$ follows from the fact that if $W$ is walk-safe, then $\overline{\mathbf{Y}}_W=\mathbf{Y}_W$, $(\dagger)$ follows from the fact that given $\mathbf{x}$ and $\mathcal{E}$, the random variable $(\mathbbm{1}_{\{uv\in\mathbf{G}\}})_{uv\in E(W)}$ is conditionally independent from $(\overline{\mathbf{Y}}_{u'v'})_{u'v'\in E(H)\setminus E(W)}$, $(\ddagger)$ follows from the fact that given $\mathbf{x}$, the event $\mathcal{E}$ is conditionally independent of $(\mathbbm{1}_{\{uv\in\mathbf{G}\}})_{uv\in E(W)}$, and $(\wr)$ follows from \Cref{lem:lem_expectation_walk_multiplicity_1}.
\end{proof}

Now we are ready to prove \Cref{lem:lem_X_hat_mult_1}.

\begin{proof}[Proof of \Cref{lem:lem_X_hat_mult_1}]
For every $\mathcal{W}\subseteq\mathcal{W}_{1r}(H)$, define the following events:
$$\mathcal{E}_{\mathcal{W},H,c\text{-}ws}=\big\{\mathcal{W}\text{ is completely }(\mathbf{G},H)\text{-walk-safe}\big\},$$
and
$$\mathcal{E}_{\mathcal{W},H,c\text{-}wus}=\big\{\mathcal{W}\text{ is completely }(\mathbf{G},H)\text{-walk-unsafe}\big\}.$$

We have:
\begin{align*}
&\mathbb{E}\big[\hat{\mathbf{Y}}_{H}\big|\mathbf{x}\big]\\
&=\sum_{\mathcal{W}\subseteq\mathcal{W}_{1r}(H)}\mathbb{E}\big[\hat{\mathbf{Y}}_{H}\big|\mathbf{x},\mathcal{E}_{\mathcal{W},H,c\text{-}ws}\cap \mathcal{E}_{\mathcal{W}_{1r}(H)\setminus\mathcal{W},H,c\text{-}wus}\big]\cdot \mathbb{P}\left[\mathcal{E}_{\mathcal{W},H,c\text{-}ws}\cap \mathcal{E}_{\mathcal{W}_{1r}(H)\setminus\mathcal{W},H,c\text{-}wus}\right].
\end{align*}

Now from \Cref{lem:lem_expectation_safe_walk}, we know that for every $\mathcal{W}\subseteq\mathcal{W}_{1r}(H)$ satisfying $\mathcal{W}\neq\varnothing$, we have
$$\mathbb{E}\big[\hat{\mathbf{Y}}_{H}\big|\mathbf{x},\mathcal{E}_{\mathcal{W},H,c\text{-}ws}\cap \mathcal{E}_{\mathcal{W}_{1r}(H)\setminus\mathcal{W},H,c\text{-}wus}\big]=0.$$
Therefore,
\begin{equation}
\label{eq:eq_lem_X_hat_mult_1_1}
\mathbb{E}\big[\hat{\mathbf{Y}}_{H}\big|\mathbf{x}\big]=\mathbb{E}\big[\hat{\mathbf{Y}}_{H}\big|\mathbf{x},\mathcal{E}_{\mathcal{W}_{1r}(H),H,c\text{-}wus}\big]\cdot \mathbb{P}\left[\mathcal{E}_{\mathcal{W}_{1r}(H),H,c\text{-}wus}\right].
\end{equation}

Now from \Cref{lem:lem_bound_prob_walks_completely_unsafe}, we have
\begin{equation}
\label{eq:eq_lem_X_hat_mult_1_2}
\mathbb{P}\left[\mathcal{E}_{\mathcal{W}_{1r}(H),H,c\text{-}wus}\right]\leq (s\eta)^{\frac{1}{2s\tau}|\mathcal{W}_{1r}(H)|}=\left[(s\eta)^{\frac{1}{2s\tau}}\right]^{|\mathcal{W}_{1r}(H)|}.
\end{equation}

On the other hand, we have
\begin{align*}
\mathbb{E}\big[\hat{\mathbf{Y}}_{H}\big|&\mathbf{x},\mathcal{E}_{\mathcal{W}_{1r}(H),H,c\text{-}wus}\big]\\
&=\mathbb{E}\left[\prod_{W\in\mathcal{W}(H)} \left(\overline{\mathbf{Y}}_W-\left(\frac{\epsilon d}{2n}\right)^s \mathbf{x}_W\right)\middle|\mathbf{x},\mathcal{E}_{\mathcal{W}_{1r}(H),H,c\text{-}wus}\right]\\
&=\sum_{\mathcal{W}\subseteq \mathcal{W}(H)}\left[\prod_{W\in\mathcal{W}(H)\setminus \mathcal{W}}\left(-\left(\frac{\epsilon d}{2n}\right)^s \mathbf{x}_W\right)\right]\cdot \mathbb{E}\left[\prod_{W\in\mathcal{W}} \overline{\mathbf{Y}}_W\middle|\mathbf{x},\mathcal{E}_{\mathcal{W}_{1r}(H),H,c\text{-}wus}\right]\\
&=\sum_{\mathcal{W}\subseteq \mathcal{W}(H)}\left[\prod_{W\in\mathcal{W}(H)\setminus \mathcal{W}}\left(-\left(\frac{\epsilon d}{2n}\right)^s \mathbf{x}_W\right)\right]\cdot \mathbb{E}\left[\overline{\mathbf{Y}}_{H_{\mathcal{W}}}\middle|\mathbf{x},\mathcal{E}_{\mathcal{W}_{1r}(H),H,c\text{-}wus}\right].
\end{align*}

Therefore,
\begin{equation}
\label{eq:eq_lem_X_hat_mult_1_3}
\left|\mathbb{E}\big[\hat{\mathbf{Y}}_{H}\big|\mathbf{x},\mathcal{E}_{\mathcal{W}_{1r}(H),H,c\text{-}wus}\big]\right|\leq\sum_{\mathcal{W}\subseteq \mathcal{W}(H)}\left(\frac{\epsilon d}{2n}\right)^{s|\mathcal{W}(H)\setminus \mathcal{W}|} \cdot \left|\mathbb{E}\left[\overline{\mathbf{Y}}_{H_{\mathcal{W}}}\middle|\mathbf{x},\mathcal{E}_{\mathcal{W}_{1r}(H),H,c\text{-}wus}\right]\right|.
\end{equation}

Now fix $\mathcal{W}\subseteq\mathcal{W}(H)$ and let $\mathcal{S}(H_{\mathcal{W}})=\mathcal{S}_1(H_{\mathcal{W}})\cap \mathcal{S}_{\geq 2}(H_{\mathcal{W}})$. If we take $\mathcal{E}=\mathcal{E}_{\mathcal{W}_{1r}(H),H,c\text{-}wus}$ and $\displaystyle U=\mathcal{S}(H_{\mathcal{W}})\setminus \left(\bigcup_{W\in\mathcal{W}_{1r}(H)}V(W)\right)$, it is easy to see that the conditions of \Cref{lem:lem_UH_deg1_common} and \Cref{lem:lem_UH_deg1_common_2} are satisfied. Therefore,

$$\big|\mathbb{E}\big[\overline{\mathbf{Y}}_H\big|\mathbf{x},\mathcal{E}_{\mathcal{W}_{1r}(H),H,c\text{-}wus}\big]\big|\leq n^{\frac{2K}{A}}\left(\frac{6}{\epsilon}\right)^{|E_{1a}(H_{\mathcal{W}})|+\tau(|\mathcal{S}(H_{\mathcal{W}})|-|U|)}\left(\frac{\epsilon d}{2n}\right)^{|E_1(H_{\mathcal{W}})|} \cdot\mathbb{E}\big[|\tilde{\mathbf{Y}}_{E_{\geq 2}(H_{\mathcal{W}})}|\big|\mathbf{x}\big].$$

Now observe that $E_{1a}(H_{\mathcal{W}})\subseteq E_{1a}(H)\cup \big(E_1(H_{\mathcal{W}})\cap E_{\geq 2}(H)\big)$ and
$$E_1(H_{\mathcal{W}})\cap E_{\geq 2}(H)\subseteq \bigcup_{W\in\mathcal{W}_{\geq2}(H)\setminus\mathcal{W}}W,$$
which implies that
$$|E_{1a}(H_{\mathcal{W}})|\leq |E_{1a}(H)|+s|\mathcal{W}_{\geq2}(H)\setminus\mathcal{W}|.$$

On the other hand, we have
$$|S(H_{\mathcal{W}})|-|U|\leq \left|\bigcup_{W\in\mathcal{W}_{1r}(H)}V(W)\right|\leq \sum_{W\in\mathcal{W}_{1r}(H)}|V(W)|=|\mathcal{W}_{1r}(H)|\cdot(s+1)\leq 2s\cdot |\mathcal{W}_{1r}(H)|.$$

Therefore,
\begin{align*}
\big|\mathbb{E}\big[\overline{\mathbf{Y}}_H\big|\mathbf{x},&\mathcal{E}_{\mathcal{W}_{1r}(H),H,c\text{-}wus}\big]\big|\\
&\leq n^{\frac{2K}{A}}\left(\frac{6}{\epsilon}\right)^{|E_1^a(H)|+s|\mathcal{W}_{\geq2}(H)\setminus\mathcal{W}|+2s\tau\cdot |\mathcal{W}_{1r}(H)|}\left(\frac{\epsilon d}{2n}\right)^{|E_1(H_{\mathcal{W}})|} \cdot\mathbb{E}\big[|\tilde{\mathbf{Y}}_{E_{\geq 2}(H_{\mathcal{W}})}|\big|\mathbf{x}\big]\\
&\leq n^{\frac{2K}{A}}\left(\frac{6}{\epsilon}\right)^{|E_1^a(H)|+s|\mathcal{W}_{\geq2}(H)\setminus\mathcal{W}|}\cdot\left[\left(\frac{6}{\epsilon}\right)^{2s\tau}\right]^{|\mathcal{W}_{1r}(H)|}\cdot\left(\frac{\epsilon d}{2n}\right)^{|E_1(H_{\mathcal{W}})|} \cdot\mathbb{E}\big[|\tilde{\mathbf{Y}}_{E_{\geq 2}(H_{\mathcal{W}})}|\big|\mathbf{x}\big]\\
\end{align*}

By combining this with \cref{eq:eq_lem_X_hat_mult_1_1} and \cref{eq:eq_lem_X_hat_mult_1_2} and \cref{eq:eq_lem_X_hat_mult_1_3}, we get
\begin{equation*}
\begin{aligned}
\big|\mathbb{E}\big[\hat{\mathbf{Y}}_{H}\big|\mathbf{x}\big]\big|\leq \sum_{\mathcal{W}\subseteq \mathcal{W}(H)}&\left(\frac{\epsilon d}{2n}\right)^{s|\mathcal{W}(H)\setminus \mathcal{W}|} \cdot n^{\frac{2K}{A}}\cdot\left(\frac{6}{\epsilon}\right)^{|E_1^a(H)|+s|\mathcal{W}_{\geq2}(H)\setminus\mathcal{W}|}\\
&\times \left[(s\eta)^{\frac{1}{2s\tau}}\cdot\left(\frac{6}{\epsilon}\right)^{2s\tau}\right]^{|\mathcal{W}_{1r}(H)|}\cdot\left(\frac{\epsilon d}{2n}\right)^{|E_1(H_{\mathcal{W}})|} \cdot\mathbb{E}\big[|\tilde{\mathbf{Y}}_{E_{\geq 2}(H_{\mathcal{W}})}|\big|\mathbf{x}\big].
\end{aligned}
\end{equation*}

Now if $A>\max\{100K,1\}$, we get from \cref{eq:eq_def_eta} that $$\eta = \frac{1}{As\cdot 2^{6A\tau s^2}}\left(\frac{\epsilon}{6}\right)^{4s^2\tau^2}\leq \frac{1}{s\cdot 2^{6A\tau s^2}}\left(\frac{\epsilon}{6}\right)^{4s^2\tau^2},$$ hence
\begin{align*}
(s\eta)^{\frac{1}{2s\tau}}\cdot\left(\frac{6}{\epsilon}\right)^{2s\tau}&\leq\left(\frac{1}{2^{6A\tau s^2}}\left(\frac{\epsilon}{6}\right)^{4s^2\tau^2}\right)^{\frac{1}{2s\tau}}\cdot\left(\frac{6}{\epsilon}\right)^{2s\tau}=\frac{1}{2^{3As}}\cdot\left(\frac{\epsilon}{6}\right)^{2s\tau}\cdot\left(\frac{6}{\epsilon}\right)^{2s\tau}=\frac{1}{2^{3As}}.
\end{align*}

Therefore,
\begin{equation}
\label{eq:eq_lem_X_hat_mult_1_4}
\begin{aligned}
\resizebox{0.95\textwidth}{!}{$\displaystyle\big|\mathbb{E}\big[\hat{\mathbf{Y}}_{H}\big|\mathbf{x}\big]\big|\leq \frac{n^{\frac{2K}{A}}}{2^{3As\cdot|\mathcal{W}_{1r}(H)|}} \cdot\left(\frac{6}{\epsilon}\right)^{|E_1^a(H)|}\cdot\sum_{\mathcal{W}\subseteq \mathcal{W}(H)}\left(\frac{6}{\epsilon}\right)^{s|\mathcal{W}_{\geq2}(H)\setminus\mathcal{W}|}\left(\frac{\epsilon d}{2n}\right)^{s|\mathcal{W}(H)\setminus \mathcal{W}|+|E_1(H_{\mathcal{W}})|} \cdot\mathbb{E}\big[|\tilde{\mathbf{Y}}_{E_{\geq 2}(H_{\mathcal{W}})}|\big|\mathbf{x}\big]$}.
\end{aligned}
\end{equation}

Now we have
\begin{equation}
\label{eq:eq_lem_X_hat_mult_1_5}
\begin{aligned}
&\sum_{\mathcal{W}\subseteq \mathcal{W}(H)}\left(\frac{6}{\epsilon}\right)^{s|\mathcal{W}_{\geq2}(H)\setminus\mathcal{W}|}\left(\frac{\epsilon d}{2n}\right)^{s|\mathcal{W}(H)\setminus \mathcal{W}|+|E_1(H_{\mathcal{W}})|} \cdot\mathbb{E}\big[|\tilde{\mathbf{Y}}_{E_{\geq 2}(H_{\mathcal{W}})}|\big|\mathbf{x}\big]\\
&=\resizebox{0.95\textwidth}{!}{$\displaystyle\sum_{\mathcal{W}_1\subseteq \mathcal{W}_1(H)}\sum_{\mathcal{W}_{\geq 2}\subseteq \mathcal{W}_{\geq 2}(H)}\left(\frac{6}{\epsilon}\right)^{s|\mathcal{W}_{\geq2}(H)\setminus(\mathcal{W}_1\cup\mathcal{W}_{\geq 2})|}\left(\frac{\epsilon d}{2n}\right)^{s|\mathcal{W}(H)\setminus (\mathcal{W}_1\cup\mathcal{W}_{\geq 2})|+|E_1(H_{\mathcal{W}_1\cup\mathcal{W}_{\geq 2}})|} \cdot\mathbb{E}\big[|\tilde{\mathbf{Y}}_{E_{\geq 2}(H_{\mathcal{W}_1\cup\mathcal{W}_{\geq 2}})}|\big|\mathbf{x}\big]$},\\
\end{aligned}
\end{equation}

Now for every $\mathcal{W}_1\subseteq\mathcal{W}_1(H)$ and every $\mathcal{W}_{\geq2}\subset\mathcal{W}_{\geq 2}(H)$, we have
\begin{equation}
\label{eq:eq_lem_X_hat_mult_1_625}
\big|\mathcal{W}_{\geq2}(H)\setminus\big(\mathcal{W}_1\cup\mathcal{W}_{\geq 2}\big)\big|=|\mathcal{W}_{\geq2}(H)\setminus \mathcal{W}_{\geq 2}|=|\mathcal{W}_{\geq2}(H)|-|\mathcal{W}_{\geq 2}|,
\end{equation}
$$E_1(H_{\mathcal{W}_1\cup\mathcal{W}_{\geq 2}})=E_1(H_{\mathcal{W}_1})\cup E_1(H_{\mathcal{W}_{\geq 2}}),$$
$$E_1(H_{\mathcal{W}_1})\cap E_1(H_{\mathcal{W}_{\geq 2}})=\varnothing,$$
and
$$|E_1(H_{\mathcal{W}_1})|=s|\mathcal{W}_1|,$$
hence
$$|E_1(H_{\mathcal{W}_1\cup\mathcal{W}_{\geq 2}})|=s|\mathcal{W}_1|+|E_1(H_{\mathcal{W}_{\geq 2}})|.$$

On the other hand, 
\begin{align*}
|\mathcal{W}(H)\setminus (\mathcal{W}_1\cup\mathcal{W}_{\geq 2})|&=|\mathcal{W}(H)|-|\mathcal{W}_1|-|\mathcal{W}_{\geq 2}|\\
&=|\mathcal{W}_1(H)|+|\mathcal{W}_{\geq 2}(H)|-|\mathcal{W}_1|-|\mathcal{W}_{\geq 2}|.
\end{align*}

Therefore,
\begin{equation}
\label{eq:eq_lem_X_hat_mult_1_6}
\begin{aligned}
s|\mathcal{W}(H)\setminus& (\mathcal{W}_1\cup\mathcal{W}_{\geq 2})|+|E_1(H_{\mathcal{W}_1\cup\mathcal{W}_{\geq 2}})|\\
&=s|\mathcal{W}_1(H)|+s|\mathcal{W}_{\geq 2}(H)|-s|\mathcal{W}_1|-s|\mathcal{W}_{\geq 2}|+s|\mathcal{W}_1|+|E_1(H_{\mathcal{W}_{\geq 2}})|\\
&=s|\mathcal{W}_1(H)|+s(|\mathcal{W}_{\geq 2}(H)|-|\mathcal{W}_{\geq 2}|)+|E_1(H_{\mathcal{W}_{\geq 2}})|.
\end{aligned}
\end{equation}

Furthermore, we have
\begin{equation}
\label{eq:eq_lem_X_hat_mult_1_7}
E_{\geq 2}(H_{\mathcal{W}_1\cup\mathcal{W}_{\geq 2}})=E_{\geq 2}(H_{\mathcal{W}_{\geq 2}}).
\end{equation}

By combining \cref{eq:eq_lem_X_hat_mult_1_625}, \cref{eq:eq_lem_X_hat_mult_1_5}, \cref{eq:eq_lem_X_hat_mult_1_6}, and \cref{eq:eq_lem_X_hat_mult_1_7}, we get
\begin{align*}
&\sum_{\mathcal{W}\subseteq \mathcal{W}(H)}\left(\frac{6}{\epsilon}\right)^{s|\mathcal{W}_{\geq2}(H)\setminus\mathcal{W}|}\left(\frac{\epsilon d}{2n}\right)^{s|\mathcal{W}(H)\setminus \mathcal{W}|+|E_1(H_{\mathcal{W}})|} \cdot\mathbb{E}\big[|\tilde{\mathbf{Y}}_{E_{\geq 2}(H_{\mathcal{W}})}|\big|\mathbf{x}\big]\\
&=\resizebox{0.95\textwidth}{!}{$\displaystyle\sum_{\mathcal{W}_1\subseteq \mathcal{W}_1(H)}\sum_{\mathcal{W}_{\geq 2}\subseteq \mathcal{W}_{\geq 2}(H)}\left(\frac{6}{\epsilon}\right)^{s(|\mathcal{W}_{\geq2}(H)|-|\mathcal{W}_{\geq 2}|)}\left(\frac{\epsilon d}{2n}\right)^{s|\mathcal{W}_1(H)|+s(|\mathcal{W}_{\geq 2}(H)|-|\mathcal{W}_{\geq 2}|)+|E_1(H_{\mathcal{W}_{\geq 2}})|} \cdot\mathbb{E}\big[|\tilde{\mathbf{Y}}_{E_{\geq 2}(H_{\mathcal{W}_{\geq 2}})}|\big|\mathbf{x}\big]$}\\
&=\resizebox{0.95\textwidth}{!}{$\displaystyle\left(\frac{\epsilon d}{2n}\right)^{s|\mathcal{W}_1(H)|}\cdot\left[\sum_{\mathcal{W}_1\subseteq \mathcal{W}_1(H)} 1\right]\cdot\sum_{\mathcal{W}_{\geq 2}\subseteq \mathcal{W}_{\geq 2}(H)}\left(\frac{3d}{n}\right)^{s(|\mathcal{W}_{\geq 2}(H)|-|\mathcal{W}_{\geq 2}|)}\left(\frac{\epsilon d}{2n}\right)^{|E_1(H_{\mathcal{W}_{\geq 2}})|} \cdot\mathbb{E}\big[|\tilde{\mathbf{Y}}_{E_{\geq 2}(H_{\mathcal{W}_{\geq 2}})}|\big|\mathbf{x}\big]$}\\
&=2^{|\mathcal{W}_1(H)|}\cdot \left(\frac{\epsilon d}{2n}\right)^{s|\mathcal{W}_1(H)|}\cdot\sum_{\mathcal{W}_{\geq 2}\subseteq \mathcal{W}_{\geq 2}(H)}\left(\frac{3d}{n}\right)^{s(|\mathcal{W}_{\geq 2}(H)|-|\mathcal{W}_{\geq 2}|)}\left(\frac{\epsilon d}{2n}\right)^{|E_1(H_{\mathcal{W}_{\geq 2}})|} \cdot\mathbb{E}\big[|\tilde{\mathbf{Y}}_{E_{\geq 2}(H_{\mathcal{W}_{\geq 2}})}|\big|\mathbf{x}\big]\\
&=2^{|\mathcal{W}_1(H)|}\cdot \left(\frac{\epsilon d}{2n}\right)^{s|\mathcal{W}_1(H)|}\cdot\sum_{\mathcal{W}_{\geq 2}\subseteq \mathcal{W}_{\geq 2}(H)}F_{\mathcal{W}_{\geq 2}}(\mathbf{x}).
\end{align*}

By combining this with \cref{eq:eq_lem_X_hat_mult_1_4} , we get the lemma.
\end{proof}

\subsubsection{Analyzing walks of multiplicity at least 2}
\label{subsubsec:subsubsec_centered_walks_mult_2}

In order to prove \Cref{lem:lem_F_W_2_total_Upper_Bound}, we need a few definitions and lemmas.

\begin{definition}
\label{def:def_K_W_2_Y}
Let $H\in\bsaw{s}{t}$. For every $\mathcal{W}_{\geq 2}\subseteq\mathcal{W}_{\geq 2}(H)$, define
$$K_{\mathcal{W}_{\geq 2}}(\mathbf{x})=\frac{\left(\frac{3d}{n}\right)^{s(|\mathcal{W}_{\geq2}(H)|-|\mathcal{W}_{\geq 2}|)}\left(\frac{\epsilon d}{2n}\right)^{|E_1(H_{\mathcal{W}_{\geq 2}})|}}{\resizebox{0.1\textwidth}{!}{$\displaystyle\prod_{v\in\mathcal{L}_{\geq2}(H_{\mathcal{W}_{\geq 2}})}$} n^{\frac{1}{4}\left(d_{\geq 2}^{H_{\mathcal{W}_{\geq 2}}}(v)-\Delta\right)}}\cdot\left(\frac{d}{n}\right)^{|E_{\geq2}^b(H_{\mathcal{W}_{\geq 2}})|}\cdot\prod_{uv\in E_{\geq2}^a(H_{\mathcal{W}_{\geq 2}})}\left[\left(1+\frac{\epsilon \mathbf{x}_u\mathbf{x}_v}{2}\right)\frac{d}{n}+\frac{d^2}{n^2}\right].$$
\end{definition}

\begin{lemma}
\label{lem:lem_F_W_2_K_W_2}
Let $H\in\bsaw{s}{t}$. For every $\mathcal{W}_{\geq 2}\subseteq\mathcal{W}_{\geq 2}(H)$, we have
$$F_{\mathcal{W}_{\geq 2}}(\mathbf{x})\leq K_{\mathcal{W}_{\geq 2}}(\mathbf{x}).$$
\end{lemma}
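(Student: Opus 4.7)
The plan is to recognize this bound as a direct application of Lemma~\ref{lem:lem_UH_deg2} to the submultigraph $H_{\mathcal{W}_{\geq 2}}$ of $H$, followed by a purely algebraic reshuffling of the resulting expression to match the definition of $K_{\mathcal{W}_{\geq 2}}(\mathbf{x})$ in Definition~\ref{def:def_K_W_2_Y}.

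More concretely, my first step will be to note that the quantity $\tilde{\mathbf{Y}}_{E_{\geq 2}(H_{\mathcal{W}_{\geq 2}})}$ appearing in the definition of $F_{\mathcal{W}_{\geq 2}}(\mathbf{x})$ is exactly $\tilde{\mathbf{Y}}_{\geq 2}^{H_{\mathcal{W}_{\geq 2}}}$ as defined in Definition~\ref{def:def_tilde_X_E_2} (applied to the multigraph $H_{\mathcal{W}_{\geq 2}}$ instead of $H$). Since $H\in\bsaw{s}{t}$ and $H_{\mathcal{W}_{\geq 2}}$ is a submultigraph of $H$, the multigraph $H_{\mathcal{W}_{\geq 2}}$ satisfies the hypotheses of Lemma~\ref{lem:lem_UH_deg2} (at most $st$ vertices and at most $st$ multi-edges). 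Applying that lemma gives
\[
\mathbb{E}\big[|\tilde{\mathbf{Y}}_{E_{\geq 2}(H_{\mathcal{W}_{\geq 2}})}|\,\big|\,\mathbf{x}\big]
\leq \frac{1}{\displaystyle\prod_{v\in\mathcal{L}_{\geq2}(H_{\mathcal{W}_{\geq 2}})} n^{\frac{1}{4}(d^{H_{\mathcal{W}_{\geq 2}}}_{\geq 2}(v)-\Delta)}}
\left(\frac{d}{n}\right)^{|E_{\geq2}(H_{\mathcal{W}_{\geq 2}})|}
\prod_{uv\in E_{\geq2}^a(H_{\mathcal{W}_{\geq 2}})}\!\Bigl[1+\tfrac{\epsilon \mathbf{x}_u\mathbf{x}_v}{2}+\tfrac{d}{n}\Bigr].
\]

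Next, I will perform the algebraic rearrangement. Using the fact that $\{E_{\geq 2}^a(H_{\mathcal{W}_{\geq 2}}), E_{\geq 2}^b(H_{\mathcal{W}_{\geq 2}})\}$ is a partition of $E_{\geq 2}(H_{\mathcal{W}_{\geq 2}})$, I will split
\[
\left(\frac{d}{n}\right)^{|E_{\geq 2}(H_{\mathcal{W}_{\geq 2}})|} = \left(\frac{d}{n}\right)^{|E_{\geq 2}^b(H_{\mathcal{W}_{\geq 2}})|}\cdot\left(\frac{d}{n}\right)^{|E_{\geq 2}^a(H_{\mathcal{W}_{\geq 2}})|},
\]
and absorb the second factor into the product over $E_{\geq 2}^a(H_{\mathcal{W}_{\geq 2}})$, turning each factor $\bigl[1+\tfrac{\epsilon \mathbf{x}_u\mathbf{x}_v}{2}+\tfrac{d}{n}\bigr]$ into $\bigl[(1+\tfrac{\epsilon \mathbf{x}_u\mathbf{x}_v}{2})\tfrac{d}{n}+\tfrac{d^2}{n^2}\bigr]$. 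This yields exactly the second line of the expression defining $K_{\mathcal{W}_{\geq 2}}(\mathbf{x})$.

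Finally, multiplying both sides by the common prefactor $\left(\frac{3d}{n}\right)^{s(|\mathcal{W}_{\geq2}(H)|-|\mathcal{W}_{\geq 2}|)}\left(\frac{\epsilon d}{2n}\right)^{|E_1(H_{\mathcal{W}_{\geq 2}})|}$ that appears in the definition of $F_{\mathcal{W}_{\geq 2}}(\mathbf{x})$ yields the claimed inequality $F_{\mathcal{W}_{\geq 2}}(\mathbf{x})\leq K_{\mathcal{W}_{\geq 2}}(\mathbf{x})$. There is no real obstacle here: the statement is essentially a renaming exercise once one invokes Lemma~\ref{lem:lem_UH_deg2} for the appropriate submultigraph, and the only thing to be slightly careful about is confirming that the hypothesis $|V(H_{\mathcal{W}_{\geq 2}})|\leq st$ and $|M(H_{\mathcal{W}_{\geq 2}})|\leq st$ (inherited from $H\in\bsaw{s}{t}$) is sufficient to legitimately apply that earlier lemma.
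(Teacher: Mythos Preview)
Your proposal is correct and matches the paper's approach exactly: the paper's proof is the one-line statement that the lemma is a direct corollary of Lemma~\ref{lem:lem_UH_deg2} together with the partition $E_{\geq2}(H_{\mathcal{W}_{\geq 2}})=E_{\geq2}^a(H_{\mathcal{W}_{\geq 2}})\cup E_{\geq2}^b(H_{\mathcal{W}_{\geq 2}})$, and you have simply spelled out the algebra behind that one line.
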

\begin{proof}
This is a direct corollary from \Cref{lem:lem_UH_deg2} and the fact that $E_{\geq2}(H_{\mathcal{W}_{\geq 2}})=E_{\geq2}^a(H_{\mathcal{W}_{\geq 2}})\cup E_{\geq2}^b(H_{\mathcal{W}_{\geq 2}})$.
\end{proof}

\Cref{lem:lem_F_W_2_K_W_2} implies that $\displaystyle\sum_{\mathcal{W}_{\geq 2}\subseteq\mathcal{W}_{\geq 2}(H)}F_{\mathcal{W}_{\geq 2}}(\mathbf{x})\leq \sum_{\mathcal{W}_{\geq 2}\subseteq\mathcal{W}_{\geq 2}(H)}K_{\mathcal{W}_{\geq 2}}(\mathbf{x})$. In the following few lemmas, we will prove an upper bound on $\displaystyle\sum_{\mathcal{W}_{\geq 2}\subseteq\mathcal{W}_{\geq 2}(H)}K_{\mathcal{W}_{\geq 2}}(\mathbf{x})$. This will yield an upper bound on $\displaystyle\sum_{\mathcal{W}_{\geq 2}\subseteq\mathcal{W}_{\geq 2}(H)}F_{\mathcal{W}_{\geq 2}}(\mathbf{x})$.

The following lemma compares $K_{\mathcal{W}_{\geq 2}}(\mathbf{x})$ and $K_{\mathcal{W}_{\geq 2}'}(\mathbf{x})$ in the case where $\mathcal{W}_{\geq 2}$ and $\mathcal{W}_{\geq 2}'$ differ by exactly one walk.

\begin{lemma} 
\label{lem:lem_K_W_2_Comparison}
If $H\in\bsaw{s}{t}$ and $n$ is large enough, then for every $\mathcal{W}_{\geq 2}\subseteq\mathcal{W}_{\geq 2}(H)$ and every $W\in \mathcal{W}_{\geq 2}$, if we define $\mathcal{W}_{\geq 2}'=\mathcal{W}_{\geq 2}\setminus \{W\}$, then
\begin{align*}
&\frac{K_{\mathcal{W}_{\geq 2}'}(\mathbf{x})}{K_{\mathcal{W}_{\geq 2}}(\mathbf{x})}\\
&\leq \resizebox{0.95\textwidth}{!}{$\displaystyle  \frac{\left(1+\frac{1}{\sqrt{n}}\right)\cdot \left(\frac{3 d}{n}\right)^{|E_{\geq 2}(H_{\mathcal{W}_{\geq 2}'})\cap W|}}{\left(\frac{2\sqrt{n}}{9 d}\right)^{|E_{\geq 2}(H_{\mathcal{W}_{\geq 2}})\setminus E_{\geq 2}(H_{\mathcal{W}_{\geq 2}'})|}\cdot \left(\frac{n\sqrt{n}}{2d}\right)^{|E_{\geq2}^a(H_{\mathcal{W}_{\geq 2}})\setminus E_{\geq 2}(H_{\mathcal{W}_{\geq 2}'})|}\cdot\resizebox{0.17\textwidth}{!}{$\displaystyle\prod_{uv\in E_{\geq2}^a(H_{\mathcal{W}_{\geq 2}})\setminus E_{\geq 2}(H_{\mathcal{W}_{\geq 2}'})}$}\left[\left(1+\frac{\epsilon \mathbf{x}_u\mathbf{x}_v}{2}\right)\frac{d}{n}+\frac{d^2}{n^2}\right]}$}.
\end{align*}
\end{lemma}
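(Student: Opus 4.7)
}
The plan is to partition the $s$ edges of $W$ into three categories according to their status in $H_{\mathcal{W}_{\geq 2}'}$ and then track how each of the five multiplicative factors defining $K_{\mathcal{W}_{\geq 2}}(\mathbf{x})$ in \Cref{def:def_K_W_2_Y} changes when $W$ is removed from $\mathcal{W}_{\geq 2}$. Concretely, I set
\[
k_1 = |E_{\geq 2}(H_{\mathcal{W}_{\geq 2}'}) \cap W|,\qquad
k_2 = |E_{\geq 2}(H_{\mathcal{W}_{\geq 2}}) \setminus E_{\geq 2}(H_{\mathcal{W}_{\geq 2}'})|,\qquad
k_3 = s - k_1 - k_2,
\]
corresponding respectively to edges of $W$ whose multiplicity in $H_{\mathcal{W}_{\geq 2}'}$ was already $\geq 2$, exactly $1$, or $0$. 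The first factor of $K$ contributes a ratio $(3d/n)^s$ since $|\mathcal{W}_{\geq 2}(H)|-|\mathcal{W}_{\geq 2}'| = |\mathcal{W}_{\geq 2}(H)|-|\mathcal{W}_{\geq 2}|+1$. The second factor contributes $(\epsilon d/2n)^{k_2-k_3}$ since removing $W$ deletes the $k_3$ Category-3 edges from $E_1$ and promotes the $k_2$ Category-2 edges from $E_{\geq 2}$ back to $E_1$.

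The second step is to handle the three remaining factors, which are entangled because they all depend on the $(\geq 2)$-degrees of vertices in $H_{\mathcal{W}_{\geq 2}}$. Category-1 edges are unaffected on both sides (they remain in $E_{\geq 2}$ and their endpoints retain their $(\geq 2)$-degree contribution), so I would pull out a clean factor of $(3d/n)^{k_1}$ from the $(3d/n)^s$ of Step 1, which exactly matches the numerator $(3d/n)^{|E_{\geq 2}(H_{\mathcal{W}_{\geq 2}'})\cap W|}$ in the claim. Category-2 edges each decrement the $(\geq 2)$-degree of both endpoints by $1$ upon removal of $W$; combined with the switch from $E_{\geq 2}$ to $E_1$ classification, I would show that each Cat-2 edge in $E_{\geq 2}^b(H_{\mathcal{W}_{\geq 2}})$ produces a factor of at most $\frac{9d}{2\sqrt n}$ (accounting for the $(d/n)$ that leaves Factor~4 and the $(\epsilon d/2n)$ that enters Factor~2, plus the $n^{1/4}$ gain from one less unit of excess $(\geq 2)$-degree above $\Delta$), while each Cat-2 edge in $E_{\geq 2}^a(H_{\mathcal{W}_{\geq 2}})$ produces a factor of at most $\frac{2d}{n\sqrt n} \cdot [(1+\epsilon\mathbf{x}_u\mathbf{x}_v/2)\tfrac{d}{n}+\tfrac{d^2}{n^2}]^{-1}$; these are exactly the reciprocals of the denominator terms in the lemma. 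Category-3 edges simply cancel their $(\epsilon d/2n)$ contribution against a share of the $(3d/n)^s$ from Step 1, leaving a factor of $(6/\epsilon)^{k_3} \leq (6/\epsilon)^s$, which I would absorb into the $(1+1/\sqrt n)$ slack.

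The main obstacle is the combinatorial bookkeeping around the $\mathcal{L}_{\geq 2}$ factor: removing $W$ can push a vertex $v$ from $\mathcal{L}_{\geq 2}(H_{\mathcal{W}_{\geq 2}})$ out of $\mathcal{L}_{\geq 2}(H_{\mathcal{W}_{\geq 2}'})$, in which case the entire factor $n^{-(d_{\geq 2}(v)-\Delta)/4}$ suddenly disappears from the denominator and, simultaneously, edges incident to $v$ that were in $E_{\geq 2}^b$ are reclassified as $E_{\geq 2}^a$, each picking up a new multiplicand $[(1+\epsilon\mathbf{x}_u\mathbf{x}_v/2)d/n + d^2/n^2]$. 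The key observation is that $v$ can leave $\mathcal{L}_{\geq 2}$ only if its $(\geq 2)$-degree drops from exactly $\Delta+1$ (before) to $\Delta$ (after), in which case the lost denominator factor is just $n^{1/4}$, and $v$ must have been incident to at least one Cat-2 edge; the $n^{1/4}$ I would charge against one of these edges, whose $(2\sqrt n/9d)$ or $(n\sqrt n/2d)$ denominator weight I have already extracted and which, by construction, is still polynomially large enough to absorb $n^{1/4}$ with slack $(1+1/\sqrt n)$. The finite product over the (at most $s$) vertices affected is then clearly bounded by $(1+1/\sqrt n)$ for $n$ large, yielding the desired inequality.
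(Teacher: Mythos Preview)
Your edge partition $k_1 + k_2 + k_3 = s$ and the resulting factorization of the Factor-1/Factor-2 ratio as $(3d/n)^{k_1}\cdot(3\epsilon d^2/(2n^2))^{k_2}\cdot(6/\epsilon)^{k_3}$ are correct. But the claim that the residue $(6/\epsilon)^{k_3}$ can be ``absorbed into the $(1+1/\sqrt n)$ slack'' is simply false: this quantity is a constant in $n$ (possibly as large as $(6/\epsilon)^s$) and cannot be bounded by $1+o(1)$. The paper does not isolate Category-3 edges; instead it replaces $(\epsilon d/(2n))^{k_2-k_3}$ by $(3d/n)^{k_2-k_3}$ in one inequality step, so that the combined exponent becomes $s+(k_2-k_3)=k_1+2k_2$ and $k_3$ never appears alone. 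Each Category-2 edge thereby receives \emph{two} factors of $3d/n$, which is what produces both the $(2\sqrt n/9d)$ and the $(n\sqrt n/2d)$ pieces in the denominator of the lemma. You have also misread the denominator: the exponent on $(2\sqrt n/9d)$ is $|E_{\geq 2}(H_{\mathcal{W}_{\geq 2}})\setminus E_{\geq 2}(H_{\mathcal{W}_{\geq 2}'})|$, i.e.\ \emph{all} Category-2 edges, not just those in $E_{\geq 2}^b$; the $(n\sqrt n/2d)\cdot[\cdots]$ factor for Cat-2-$a$ edges is additional.

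Your treatment of the $\mathcal{L}_{\geq 2}$ factor is the second place where the plan breaks and where it genuinely diverges from the paper. The ``key observation'' that a vertex leaving $\mathcal{L}_{\geq 2}$ must drop from degree exactly $\Delta+1$ to $\Delta$ is false: removing $W$ decreases $d^{H_{\mathcal{W}_{\geq 2}}}_{\geq 2}(v)$ by the number of Category-2 edges incident to $v$, which may exceed $1$, and $v$ may start at $\Delta+j$ for any $j\geq 1$; moreover each Cat-2 edge has two endpoints, so your charging double-counts. The paper sidesteps any per-vertex charging by first rewriting
\[
\prod_{v\in\mathcal{L}_{\geq 2}} n^{\frac14(d_{\geq 2}(v)-\Delta)}
= n^{-\frac{\Delta}{4}|\mathcal{L}_{\geq 2}|+\frac12|E_{\geq 2}^{b,i}|+\frac14|E_{\geq 2}^{b,o}|},
\]
where $E_{\geq 2}^{b,i},E_{\geq 2}^{b,o}$ are the $E_{\geq 2}^b$-edges with both, respectively exactly one, endpoint in $\mathcal{L}_{\geq 2}$. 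This converts the vertex product into edge counts, after which the ratio is handled entirely via set-inclusion identities (using $\mathcal{L}_{\geq 2}(H_{\mathcal{W}_{\geq 2}'})\subseteq\mathcal{L}_{\geq 2}(H_{\mathcal{W}_{\geq 2}})$ and $E_{\geq 2}^b(H_{\mathcal{W}_{\geq 2}'})\subseteq E_{\geq 2}^b(H_{\mathcal{W}_{\geq 2}})$); the change in $|\mathcal{L}_{\geq 2}|$ is then bounded by a degree-sum identity over the departing vertices rather than by an $n^{1/4}$-per-vertex charge.
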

\begin{proof}
We have
\begin{align*}
\prod_{v\in\mathcal{L}_{\geq2}(H_{\mathcal{W}_{\geq 2}})} n^{\frac{1}{4}\left(d^{H_{\mathcal{W}_{\geq 2}}}_{\geq 2}(v)-\Delta\right)}&=n^{-\frac{\Delta}{4}|\mathcal{L}_{\geq2}(H_{\mathcal{W}_{\geq 2}})|}\cdot n^{\resizebox{0.07\textwidth}{!}{$\displaystyle\sum_{v\in\mathcal{L}_{\geq2}(H_{\mathcal{W}_{\geq 2}})}$} \frac{1}{4}d^{H_{\mathcal{W}_{\geq 2}}}_{\geq 2}(v)}.
\end{align*}

Furthermore,
$$\sum_{v\in\mathcal{L}_{\geq2}(H_{\mathcal{W}_{\geq 2}})}d_{H_{\mathcal{W}_{\geq 2}}}(v)=2\cdot|E_{\geq2}^{b,i}(H_{\mathcal{W}_{\geq 2}})|+|E_{\geq2}^{b,o}(H_{\mathcal{W}_{\geq 2}})|,$$
where
$$E_{\geq 2}^{b,i}(H_{\mathcal{W}_{\geq 2}})=\big\{uv\in E_{\geq 2}(H_{\mathcal{W}_{\geq 2}}):\; u\in\mathcal{L}_{\geq2}(H_{\mathcal{W}_{\geq 2}})\text{ and }v\in\mathcal{L}_{\geq2}(H_{\mathcal{W}_{\geq 2}})\big\},$$
and
$$E_{\geq 2}^{b,o}(H_{\mathcal{W}_{\geq 2}})=\big\{uv\in E_{\geq 2}(H_{\mathcal{W}_{\geq 2}}):\; u\in\mathcal{L}_{\geq2}(H_{\mathcal{W}_{\geq 2}})\text{ and }v\notin\mathcal{L}_{\geq2}(H_{\mathcal{W}_{\geq 2}})\big\}.$$

Therefore,
\begin{align*}
\prod_{v\in\mathcal{L}_{\geq2}(H_{\mathcal{W}_{\geq 2}})} n^{\frac{1}{4}\left(d^{H_{\mathcal{W}_{\geq 2}}}_{\geq 2}(v)-\Delta\right)}&=n^{-\frac{\Delta}{4}|\mathcal{L}_{\geq2}(H_{\mathcal{W}_{\geq 2}})|+\frac{1}{2}|E_{\geq2}^{b,i}(H_{\mathcal{W}_{\geq 2}})|+\frac{1}{4}|E_{\geq2}^{b,o}(H_{\mathcal{W}_{\geq 2}})|},
\end{align*}
hence
\begin{align*}
&K_{\mathcal{W}_{\geq 2}}(\mathbf{x})\\
&=\frac{\left(\frac{3d}{n}\right)^{s(|\mathcal{W}_{\geq2}(H)|-|\mathcal{W}_{\geq 2}|)}\left(\frac{\epsilon d}{2n}\right)^{|E_1(H_{\mathcal{W}_{\geq 2}})|}}{n^{-\frac{\Delta}{4}|\mathcal{L}_{\geq2}(H_{\mathcal{W}_{\geq 2}})|+\frac{1}{2}|E_{\geq2}^{b,i}(H_{\mathcal{W}_{\geq 2}})|+\frac{1}{4}|E_{\geq2}^{b,o}(H_{\mathcal{W}_{\geq 2}})|}}\cdot\left(\frac{d}{n}\right)^{|E_{\geq2}^b(H_{\mathcal{W}_{\geq 2}})|}\cdot\prod_{uv\in E_{\geq2}^a(H_{\mathcal{W}_{\geq 2}})}\left[\left(1+\frac{\epsilon \mathbf{x}_u\mathbf{x}_v}{2}\right)\frac{d}{n}+\frac{d^2}{n^2}\right]\\
&=n^{\frac{\Delta}{4}|\mathcal{L}_{\geq2}(H_{\mathcal{W}_{\geq 2}})|}\cdot\left(\frac{3d}{n}\right)^{s(|\mathcal{W}_{\geq2}(H)|-|\mathcal{W}_{\geq 2}|)}\left(\frac{\epsilon d}{2n}\right)^{|E_1(H_{\mathcal{W}_{\geq 2}})|}\cdot\left(\frac{d}{n^{\frac{3}{2}}}\right)^{|E_{\geq2}^{b,i}(H_{\mathcal{W}_{\geq 2}})|}\cdot\left(\frac{d}{n^{\frac{5}{4}}}\right)^{|E_{\geq2}^{b,o}(H_{\mathcal{W}_{\geq 2}})|}\\
&\quad\quad\quad\quad\quad\quad\quad\quad\quad\quad\quad\quad\quad\quad\quad\quad\quad\quad\quad\quad\quad\quad\quad\quad\times\prod_{uv\in E_{\geq2}^a(H_{\mathcal{W}_{\geq 2}})}\left[\left(1+\frac{\epsilon \mathbf{x}_u\mathbf{x}_v}{2}\right)\frac{d}{n}+\frac{d^2}{n^2}\right].
\end{align*}

If we apply the above equation to $\mathcal{W}_2'$ and use the fact that $|\mathcal{W}_{\geq 2}'|=|\mathcal{W}_{\geq 2}|-1$, we get
\begin{align*}
&K_{\mathcal{W}_{\geq 2}'}(\mathbf{x})\\
&\quad=n^{\frac{\Delta}{4}|\mathcal{L}_{\geq2}(H_{\mathcal{W}_{\geq 2}'})|}\cdot\left(\frac{3d}{n}\right)^{s+s(|\mathcal{W}_{\geq2}(H)|-|\mathcal{W}_{\geq 2}|)}\left(\frac{\epsilon d}{2n}\right)^{|E_1(H_{\mathcal{W}_{\geq 2}'})|}\cdot\left(\frac{d}{n^{\frac{3}{2}}}\right)^{|E_{\geq2}^{b,i}(H_{\mathcal{W}_{\geq 2}'})|}\cdot\left(\frac{d}{n^{\frac{5}{4}}}\right)^{|E_{\geq2}^{b,o}(H_{\mathcal{W}_{\geq 2}'})|}\\
&\quad\quad\quad\quad\quad\quad\quad\quad\quad\quad\quad\quad\quad\quad\quad\quad\quad\quad\quad\quad\quad\quad\quad\quad\times\prod_{uv\in E_{\geq2}^a(H_{\mathcal{W}_{\geq 2}'})}\left[\left(1+\frac{\epsilon \mathbf{x}_u\mathbf{x}_v}{2}\right)\frac{d}{n}+\frac{d^2}{n^2}\right].
\end{align*}

Therefore,
\begin{equation}
\label{eq:eq_lem_K_W_2_Comparison_1}
\begin{aligned}
&\frac{K_{\mathcal{W}_{\geq 2}'}(\mathbf{x})}{K_{\mathcal{W}_{\geq 2}}(\mathbf{x})}\\
&=n^{\frac{\Delta}{4}\left(|\mathcal{L}_{\geq2}(H_{\mathcal{W}_{\geq 2}'})|-|\mathcal{L}_{\geq2}(H_{\mathcal{W}_{\geq 2}})|\right)} \cdot \left(\frac{3 d}{n}\right)^s\cdot\left(\frac{\epsilon d}{2n}\right)^{|E_1(H_{\mathcal{W}_{\geq 2}'})|-|E_1(H_{\mathcal{W}_{\geq 2}})|}\cdot \left(\frac{d}{n^{\frac{3}{2}}}\right)^{|E_{\geq2}^{b,i}(H_{\mathcal{W}_{\geq 2}'})|-|E_{\geq2}^{b,i}(H_{\mathcal{W}_{\geq 2}})|}\\
&\quad\quad\quad\quad\quad\quad\quad\quad\quad\quad\quad\quad
\times
\left(\frac{d}{n^{\frac{5}{4}}}\right)^{|E_{\geq2}^{b,o}(H_{\mathcal{W}_{\geq 2}'})|-|E_{\geq2}^{b,o}(H_{\mathcal{W}_{\geq 2}})|}\cdot\frac{
\resizebox{0.085\textwidth}{!}{$\displaystyle\prod_{uv\in E_{\geq2}^a(H_{\mathcal{W}_{\geq 2}'})}$}
\left[\left(1+\frac{\epsilon \mathbf{x}_u\mathbf{x}_v}{2}\right)\frac{d}{n}+\frac{d^2}{n^2}\right]}{\resizebox{0.085\textwidth}{!}{$\displaystyle\prod_{uv\in E_{\geq2}^a(H_{\mathcal{W}_{\geq 2}})}$}\left[\left(1+\frac{\epsilon \mathbf{x}_u\mathbf{x}_v}{2}\right)\frac{d}{n}+\frac{d^2}{n^2}\right]}.
\end{aligned}
\end{equation}

Now since $\mathcal{W}_{\geq 2}'\subseteq\mathcal{W}_{\geq 2}$, the multigraph $H_{\mathcal{W}_{\geq 2}'}$ is a submultigraph of $H_{\mathcal{W}_{\geq 2}}$, hence $d^{H_{\mathcal{W}_{\geq 2}'}}_{\geq 2}(v)\leq d^{H_{\mathcal{W}_{\geq 2}'}}_{\geq 2}(v)$ for every $v\in V(H_{\mathcal{W}_{\geq 2}'})$.  This means that $\mathcal{L}_{\geq2}(H_{\mathcal{W}_{\geq 2}'})\subseteq \mathcal{L}_{\geq2}(H_{\mathcal{W}_{\geq 2}})$. Therefore,
\begin{align*}
&\Delta\cdot \left(|\mathcal{L}_{\geq2}(H_{\mathcal{W}_{\geq 2}})|-|\mathcal{L}_{\geq2}(H_{\mathcal{W}_{\geq 2}'})|\right)\\
&=\sum_{v\in \mathcal{L}_{\geq2}(H_{\mathcal{W}_{\geq 2}})\setminus\mathcal{L}_{\geq2}(H_{\mathcal{W}_{\geq 2}'})}\Delta\geq \sum_{v\in \mathcal{L}_{\geq2}(H_{\mathcal{W}_{\geq 2}})\setminus\mathcal{L}_{\geq2}(H_{\mathcal{W}_{\geq 2}'})}d^{H_{\mathcal{W}_{\geq 2}'}}_{\geq 2}(v)\\
&= \resizebox{0.95\textwidth}{!}{$\displaystyle 2\cdot\big|\big\{uv\in E_{\geq 2}(H_{\mathcal{W}_{\geq 2}'}):\; u\in\mathcal{L}_{\geq2}(H_{\mathcal{W}_{\geq 2}})\setminus\mathcal{L}_{\geq2}(H_{\mathcal{W}_{\geq 2}'})\text{ and }v\in\mathcal{L}_{\geq2}(H_{\mathcal{W}_{\geq 2}})\setminus\mathcal{L}_{\geq2}(H_{\mathcal{W}_{\geq 2}'})\big\}\big|$}\\
&\quad+\resizebox{0.92\textwidth}{!}{$\big|\big\{uv\in E_{\geq 2}(H_{\mathcal{W}_{\geq 2}'}):\; u\in\mathcal{L}_{\geq2}(H_{\mathcal{W}_{\geq 2}})\setminus\mathcal{L}_{\geq2}(H_{\mathcal{W}_{\geq 2}'})\text{ and }v\notin\mathcal{L}_{\geq2}(H_{\mathcal{W}_{\geq 2}})\setminus\mathcal{L}_{\geq2}(H_{\mathcal{W}_{\geq 2}'})\big\}\big|$}\\
&= \resizebox{0.95\textwidth}{!}{$2\cdot\big|\big\{uv\in E_{\geq 2}(H_{\mathcal{W}_{\geq 2}'}):\; u\in\mathcal{L}_{\geq2}(H_{\mathcal{W}_{\geq 2}})\setminus\mathcal{L}_{\geq2}(H_{\mathcal{W}_{\geq 2}'})\text{ and }v\in\mathcal{L}_{\geq2}(H_{\mathcal{W}_{\geq 2}})\setminus\mathcal{L}_{\geq2}(H_{\mathcal{W}_{\geq 2}'})\big\}\big|$}\\
&\quad+\resizebox{0.92\textwidth}{!}{$\big|\big\{uv\in E_{\geq 2}(H_{\mathcal{W}_{\geq 2}'}):\; u\in\mathcal{L}_{\geq2}(H_{\mathcal{W}_{\geq 2}})\setminus\mathcal{L}_{\geq2}(H_{\mathcal{W}_{\geq 2}'})\text{ and }v\notin\mathcal{L}_{\geq2}(H_{\mathcal{W}_{\geq 2}})\text{ and }v\notin\mathcal{L}_{\geq2}(H_{\mathcal{W}_{\geq 2}'})\big\}\big|$}\\
&\quad+\big|\big\{uv\in E_{\geq 2}(H_{\mathcal{W}_{\geq 2}'}):\; u\in\mathcal{L}_{\geq2}(H_{\mathcal{W}_{\geq 2}})\setminus\mathcal{L}_{\geq2}(H_{\mathcal{W}_{\geq 2}'})\text{ and }v\in\mathcal{L}_{\geq2}(H_{\mathcal{W}_{\geq 2}'})\big\}\big|\\
&\stackrel{(\ast)}{=} \resizebox{0.95\textwidth}{!}{$2\cdot\big|\big\{uv\in E_{\geq 2}(H_{\mathcal{W}_{\geq 2}'}):\; u\in\mathcal{L}_{\geq2}(H_{\mathcal{W}_{\geq 2}})\setminus\mathcal{L}_{\geq2}(H_{\mathcal{W}_{\geq 2}'})\text{ and }v\in\mathcal{L}_{\geq2}(H_{\mathcal{W}_{\geq 2}})\setminus\mathcal{L}_{\geq2}(H_{\mathcal{W}_{\geq 2}'})\big\}\big|$}\\
&\quad+\resizebox{0.94\textwidth}{!}{$\big|\big\{uv\in E_{\geq 2}(H_{\mathcal{W}_{\geq 2}'}):\; u\in\mathcal{L}_{\geq2}(H_{\mathcal{W}_{\geq 2}})\setminus\mathcal{L}_{\geq2}(H_{\mathcal{W}_{\geq 2}'})\text{ and }v\notin\mathcal{L}_{\geq2}(H_{\mathcal{W}_{\geq 2}})\text{ and }v\notin\mathcal{L}_{\geq2}(H_{\mathcal{W}_{\geq 2}'})\big\}\big|$}\\
&\quad+\resizebox{0.94\textwidth}{!}{$\big|\big\{uv\in E_{\geq 2}(H_{\mathcal{W}_{\geq 2}'}):\; u\in\mathcal{L}_{\geq2}(H_{\mathcal{W}_{\geq 2}})\setminus\mathcal{L}_{\geq2}(H_{\mathcal{W}_{\geq 2}'})\text{ and }v\in\mathcal{L}_{\geq2}(H_{\mathcal{W}_{\geq 2}})\text{ and }v\in\mathcal{L}_{\geq2}(H_{\mathcal{W}_{\geq 2}'})\big\}\big|$}\\
&= 2\cdot\big|E_{\geq2}^a(H_{\mathcal{W}_{\geq 2}'})\cap E_{\geq2}^{b,i}(H_{\mathcal{W}_{\geq 2}})\big|+\big|E_{\geq2}^a(H_{\mathcal{W}_{\geq 2}'})\cap E_{\geq2}^{b,o}(H_{\mathcal{W}_{\geq 2}})\big|+\big|E_{\geq2}^{b,o}(H_{\mathcal{W}_{\geq 2}'})\cap E_{\geq2}^{b,i}(H_{\mathcal{W}_{\geq 2}})\big|,
\end{align*}
where $(\ast)$ follows from the fact that $\mathcal{L}_{\geq2}(H_{\mathcal{W}_{\geq 2}'})\subseteq \mathcal{L}_{\geq2}(H_{\mathcal{W}_{\geq 2}})$. By combining this with \cref{eq:eq_lem_K_W_2_Comparison_1}, we get

\begin{equation}
\label{eq:eq_lem_K_W_2_Comparison_2}
\begin{aligned}
\frac{K_{\mathcal{W}_{\geq 2}'}(\mathbf{x})}{K_{\mathcal{W}_{\geq 2}}(\mathbf{x})}&\leq n^{-\frac{1}{2}|E_{\geq2}^a(H_{\mathcal{W}_{\geq 2}'})\cap E_{\geq2}^{b,i}(H_{\mathcal{W}_{\geq 2}})|-\frac{1}{4}|E_{\geq2}^a(H_{\mathcal{W}_{\geq 2}'})\cap E_{\geq2}^{b,o}(H_{\mathcal{W}_{\geq 2}})|-\frac{1}{4}|E_{\geq2}^{b,o}(H_{\mathcal{W}_{\geq 2}'})\cap E_{\geq2}^{b,i}(H_{\mathcal{W}_{\geq 2}})|}\\
& \quad\quad\quad\quad\quad\quad\quad\times \left(\frac{3 d}{n}\right)^s\cdot\left(\frac{\epsilon d}{2n}\right)^{|E_1(H_{\mathcal{W}_{\geq 2}'})|-|E_1(H_{\mathcal{W}_{\geq 2}})|}\cdot
\left(\frac{2d}{n^{\frac{3}{2}}}\right)^{|E_{\geq2}^{b,i}(H_{\mathcal{W}_{\geq 2}'})|-|E_{\geq2}^{b,i}(H_{\mathcal{W}_{\geq 2}})|}\\
&\quad\quad\quad\quad\quad\quad\quad\times\left(\frac{2d}{n^{\frac{5}{4}}}\right)^{|E_{\geq2}^{b,o}(H_{\mathcal{W}_{\geq 2}'})|-|E_{\geq2}^{b,o}(H_{\mathcal{W}_{\geq 2}})|}\cdot\frac{
\resizebox{0.085\textwidth}{!}{$\displaystyle\prod_{uv\in E_{\geq2}^a(H_{\mathcal{W}_{\geq 2}'})}$}
\left[\left(1+\frac{\epsilon \mathbf{x}_u\mathbf{x}_v}{2}\right)\frac{d}{n}+\frac{d^2}{n^2}\right]}{\resizebox{0.085\textwidth}{!}{$\displaystyle\prod_{uv\in E_{\geq2}^a(H_{\mathcal{W}_{\geq 2}})}$}\left[\left(1+\frac{\epsilon \mathbf{x}_u\mathbf{x}_v}{2}\right)\frac{d}{n}+\frac{d^2}{n^2}\right]}.
\end{aligned}
\end{equation}

Now we have:
\begin{align*}
&\prod_{uv\in E_{\geq2}^a(H_{\mathcal{W}_{\geq 2}'})}\left[\left(1+\frac{\epsilon \mathbf{x}_u\mathbf{x}_v}{2}\right)\frac{d}{n}+\frac{d^2}{n^2}\right]\\
&=\prod_{uv\in E_{\geq2}^a(H_{\mathcal{W}_{\geq 2}'})\cap E_{\geq2}^b(H_{\mathcal{W}_{\geq 2}})}\left[\left(1+\frac{\epsilon \mathbf{x}_u\mathbf{x}_v}{2}\right)\frac{d}{n}+\frac{d^2}{n^2}\right]\cdot\prod_{uv\in E_{\geq2}^a(H_{\mathcal{W}_{\geq 2}'})\setminus E_{\geq2}^b(H_{\mathcal{W}_{\geq 2}})}\left[\left(1+\frac{\epsilon \mathbf{x}_u\mathbf{x}_v}{2}\right)\frac{d}{n}+\frac{d^2}{n^2}\right]\\
&\leq\prod_{uv\in E_{\geq2}^a(H_{\mathcal{W}_{\geq 2}'})\cap E_{\geq2}^b(H_{\mathcal{W}_{\geq 2}})}\left[\frac{2d}{n}+\frac{d^2}{n^2}\right]\cdot\prod_{uv\in E_{\geq2}^a(H_{\mathcal{W}_{\geq 2}'})\setminus E_{\geq2}^b(H_{\mathcal{W}_{\geq 2}})}\left[\left(1+\frac{\epsilon \mathbf{x}_u\mathbf{x}_v}{2}\right)\frac{d}{n}+\frac{d^2}{n^2}\right]\\
&=\prod_{uv\in E_{\geq2}^a(H_{\mathcal{W}_{\geq 2}'})\cap E_{\geq2}^b(H_{\mathcal{W}_{\geq 2}})}\left[\frac{2d}{n}\left(1+\frac{d}{2n}\right)\right]\cdot\prod_{uv\in E_{\geq2}^a(H_{\mathcal{W}_{\geq 2}'})\setminus E_{\geq2}^b(H_{\mathcal{W}_{\geq 2}})}\left[\left(1+\frac{\epsilon \mathbf{x}_u\mathbf{x}_v}{2}\right)\frac{d}{n}+\frac{d^2}{n^2}\right]\\
&\leq \left(1+O\left(\frac{d st}{2n}\right)\right)\cdot\left(\frac{2d}{n}\right)^{|E_{\geq2}^a(H_{\mathcal{W}_{\geq 2}'})\cap E_{\geq2}^b(H_{\mathcal{W}_{\geq 2}})|} \prod_{uv\in E_{\geq2}^a(H_{\mathcal{W}_{\geq 2}'})\setminus E_{\geq2}^b(H_{\mathcal{W}_{\geq 2}})}\left[\left(1+\frac{\epsilon \mathbf{x}_u\mathbf{x}_v}{2}\right)\frac{d}{n}+\frac{d^2}{n^2}\right]\\
&\leq \left(1+\frac{1}{\sqrt{n}}\right)\cdot\left(\frac{2d}{n}\right)^{|E_{\geq2}^a(H_{\mathcal{W}_{\geq 2}'})\cap E_{\geq2}^{b,i}(H_{\mathcal{W}_{\geq 2}})|}\cdot \left(\frac{2d}{n}\right)^{|E_{\geq2}^a(H_{\mathcal{W}_{\geq 2}'})\cap E_{\geq2}^{b,o}(H_{\mathcal{W}_{\geq 2}})|}\\
&\quad\quad\quad\quad\quad\quad\quad\quad\quad\quad\quad\quad\quad\quad\quad\quad\quad
\times\prod_{uv\in E_{\geq2}^a(H_{\mathcal{W}_{\geq 2}'})\setminus E_{\geq2}^b(H_{\mathcal{W}_{\geq 2}})}\left[\left(1+\frac{\epsilon \mathbf{x}_u\mathbf{x}_v}{2}\right)\frac{d}{n}+\frac{d^2}{n^2}\right],
\end{align*}
where the last inequality is true for $n$ large enough. By combining this with \cref{eq:eq_lem_K_W_2_Comparison_2}, we get

\begin{align*}
&\frac{K_{\mathcal{W}_{\geq 2}'}(\mathbf{x})}{K_{\mathcal{W}_{\geq 2}}(\mathbf{x})}\\
&\leq  \left(1+\frac{1}{\sqrt{n}}\right)\cdot n^{-\frac{1}{4}|E_{\geq2}^{b,o}(H_{\mathcal{W}_{\geq 2}'})\cap E_{\geq2}^{b,i}(H_{\mathcal{W}_{\geq 2}})|}\cdot\left(\frac{2d}{n^{\frac{3}{2}}}\right)^{|E_{\geq2}^a(H_{\mathcal{W}_{\geq 2}'})\cap E_{\geq2}^{b,i}(H_{\mathcal{W}_{\geq 2}})|}\cdot \left(\frac{2d}{n^{\frac{5}{4}}}\right)^{|E_{\geq2}^a(H_{\mathcal{W}_{\geq 2}'})\cap E_{\geq2}^{b,o}(H_{\mathcal{W}_{\geq 2}})|}\\
&\quad
\times
\left(\frac{3 d}{n}\right)^s\cdot\left(\frac{\epsilon d}{2n}\right)^{|E_1(H_{\mathcal{W}_{\geq 2}'})|-|E_1(H_{\mathcal{W}_{\geq 2}})|}\cdot\left(\frac{2d}{n^{\frac{3}{2}}}\right)^{|E_{\geq2}^{b,i}(H_{\mathcal{W}_{\geq 2}'})|-|E_{\geq2}^{b,i}(H_{\mathcal{W}_{\geq 2}})|}\cdot\left(\frac{2d}{n^{\frac{5}{4}}}\right)^{|E_{\geq2}^{b,o}(H_{\mathcal{W}_{\geq 2}'})|-|E_{\geq2}^{b,o}(H_{\mathcal{W}_{\geq 2}})|}\\
&\quad\quad\quad\quad\quad\quad\quad\quad\quad\quad\quad\quad\quad\quad\quad\quad\quad\quad
\times\frac{
\resizebox{0.17\textwidth}{!}{$\displaystyle\prod_{uv\in E_{\geq2}^a(H_{\mathcal{W}_{\geq 2}'})\setminus E_{\geq2}^b(H_{\mathcal{W}_{\geq 2}})}$}
\left[\left(1+\frac{\epsilon \mathbf{x}_u\mathbf{x}_v}{2}\right)\frac{d}{n}+\frac{d^2}{n^2}\right]}{\resizebox{0.085\textwidth}{!}{$\displaystyle\prod_{uv\in E_{\geq2}^a(H_{\mathcal{W}_{\geq 2}})}$}\left[\left(1+\frac{\epsilon \mathbf{x}_u\mathbf{x}_v}{2}\right)\frac{d}{n}+\frac{d^2}{n^2}\right]}.
\end{align*}

Hence,
\begin{align*}
\frac{K_{\mathcal{W}_{\geq 2}'}(\mathbf{x})}{K_{\mathcal{W}_{\geq 2}}(\mathbf{x})}&\leq  \left(1+\frac{1}{\sqrt{n}}\right)\cdot n^{-\frac{1}{4}|E_{\geq2}^{b,o}(H_{\mathcal{W}_{\geq 2}'})\cap E_{\geq2}^{b,i}(H_{\mathcal{W}_{\geq 2}})|}\cdot \left(\frac{3 d}{n}\right)^s\cdot\left(\frac{\epsilon d}{2n}\right)^{|E_1(H_{\mathcal{W}_{\geq 2}'})|-|E_1(H_{\mathcal{W}_{\geq 2}})|}\\
&\quad
\times\left(\frac{2d}{n^{\frac{3}{2}}}\right)^{|E_{\geq2}^{b,i}(H_{\mathcal{W}_{\geq 2}'})|-|E_{\geq2}^{b,i}(H_{\mathcal{W}_{\geq 2}})\setminus E_{\geq2}^a(H_{\mathcal{W}_{\geq 2}'})|}\cdot\left(\frac{2d}{n^{\frac{5}{4}}}\right)^{|E_{\geq2}^{b,o}(H_{\mathcal{W}_{\geq 2}'})|-|E_{\geq2}^{b,o}(H_{\mathcal{W}_{\geq 2}})\setminus E_{\geq2}^a(H_{\mathcal{W}_{\geq 2}'})|}\\
&\quad\quad\quad\quad\quad\quad\quad\quad\quad\quad\quad\quad\quad\quad\quad\quad\quad\quad
\times\frac{
\resizebox{0.17\textwidth}{!}{$\displaystyle\prod_{uv\in E_{\geq2}^a(H_{\mathcal{W}_{\geq 2}'})\setminus E_{\geq2}^b(H_{\mathcal{W}_{\geq 2}})}$}
\left[\left(1+\frac{\epsilon \mathbf{x}_u\mathbf{x}_v}{2}\right)\frac{d}{n}+\frac{d^2}{n^2}\right]}{\resizebox{0.085\textwidth}{!}{$\displaystyle\prod_{uv\in E_{\geq2}^a(H_{\mathcal{W}_{\geq 2}})}$}\left[\left(1+\frac{\epsilon \mathbf{x}_u\mathbf{x}_v}{2}\right)\frac{d}{n}+\frac{d^2}{n^2}\right]}.
\end{align*}

Now since $H_{\mathcal{W}_{\geq 2}'}$ is a submultigraph of $H_{\mathcal{W}_{\geq 2}}$, we have $E_{\geq 2}(H_{\mathcal{W}_{\geq 2}'})\subseteq E_{\geq 2}(H_{\mathcal{W}_{\geq 2}})=E_{\geq2}^a(H_{\mathcal{W}_{\geq 2}})\cup E_{\geq2}^b(H_{\mathcal{W}_{\geq 2}})$, which implies that $$E_{\geq2}^a(H_{\mathcal{W}_{\geq 2}'})\subseteq E_{\geq2}^a(H_{\mathcal{W}_{\geq 2}})\cup E_{\geq2}^b(H_{\mathcal{W}_{\geq 2}}).$$
This means that $$E_{\geq2}^a(H_{\mathcal{W}_{\geq 2}'})\setminus E_{\geq2}^b(H_{\mathcal{W}_{\geq 2}})\subseteq E_{\geq2}^a(H_{\mathcal{W}_{\geq 2}}).$$
On the other hand, since $E_{\geq2}^a(H_{\mathcal{W}_{\geq 2}})\cap E_{\geq2}^b(H_{\mathcal{W}_{\geq 2}})=\varnothing$, we have
$$E_{\geq2}^a(H_{\mathcal{W}_{\geq 2}})\setminus\big(E_{\geq2}^a(H_{\mathcal{W}_{\geq 2}'})\setminus E_{\geq2}^b(H_{\mathcal{W}_{\geq 2}})\big)=E_{\geq2}^a(H_{\mathcal{W}_{\geq 2}})\setminus E_{\geq2}^a(H_{\mathcal{W}_{\geq 2}'}).$$

Therefore,
\begin{align*}
\frac{K_{\mathcal{W}_{\geq 2}'}(\mathbf{x})}{K_{\mathcal{W}_{\geq 2}}(\mathbf{x})}&\leq  \left(1+\frac{1}{\sqrt{n}}\right)\cdot n^{-\frac{1}{4}|E_{\geq2}^{b,o}(H_{\mathcal{W}_{\geq 2}'})\cap E_{\geq2}^{b,i}(H_{\mathcal{W}_{\geq 2}})|}\cdot\left(\frac{3 d}{n}\right)^s\cdot\left(\frac{\epsilon d}{2n}\right)^{|E_1(H_{\mathcal{W}_{\geq 2}'})|-|E_1(H_{\mathcal{W}_{\geq 2}})|}\\
&\quad\quad
\times\left(\frac{2d}{n^{\frac{3}{2}}}\right)^{|E_{\geq2}^{b,i}(H_{\mathcal{W}_{\geq 2}'})|-|E_{\geq2}^{b,i}(H_{\mathcal{W}_{\geq 2}})\setminus E_{\geq2}^a(H_{\mathcal{W}_{\geq 2}'})|}\cdot\frac{\left(\frac{2d}{n^{\frac{5}{4}}}\right)^{|E_{\geq2}^{b,o}(H_{\mathcal{W}_{\geq 2}'})|-|E_{\geq2}^{b,o}(H_{\mathcal{W}_{\geq 2}})\setminus E_{\geq2}^a(H_{\mathcal{W}_{\geq 2}'})|}}{\resizebox{0.17\textwidth}{!}{$\displaystyle\prod_{uv\in E_{\geq2}^a(H_{\mathcal{W}_{\geq 2}})\setminus E_{\geq2}^a(H_{\mathcal{W}_{\geq 2}'})}$}\left[\left(1+\frac{\epsilon \mathbf{x}_u\mathbf{x}_v}{2}\right)\frac{d}{n}+\frac{d^2}{n^2}\right]}.
\end{align*}

Now since $\mathcal{L}_{\geq2}(H_{\mathcal{W}_{\geq 2}'})\subseteq \mathcal{L}_{\geq2}(H_{\mathcal{W}_{\geq 2}})$ and $E_{\geq 2}(H_{\mathcal{W}_{\geq 2}'})\subseteq E_{\geq 2}(H_{\mathcal{W}_{\geq 2}})$, we have
\begin{equation}
\label{eq:eq_lem_K_W_2_Comparison_3}
\begin{aligned}
E_{\geq2}^b(H_{\mathcal{W}_{\geq 2}'})&=\big\{uv\in E_{\geq 2}(H_{\mathcal{W}_{\geq 2}'}):\; u\in \mathcal{L}_{\geq2}(H_{\mathcal{W}_{\geq 2}'})\text{ or }v\in\mathcal{L}_{\geq2}(H_{\mathcal{W}_{\geq 2}'})\big\}\\
&\subseteq \big\{uv\in E_{\geq 2}(H_{\mathcal{W}_{\geq 2}}):\; u\in \mathcal{L}_{\geq2}(H_{\mathcal{W}_{\geq 2}})\text{ or }v\in\mathcal{L}_{\geq2}(H_{\mathcal{W}_{\geq 2}})\big\}=E_{\geq2}^b(H_{\mathcal{W}_{\geq 2}}).
\end{aligned}
\end{equation}
On the other hand, since $E_{\geq2}^{b,o}(H_{\mathcal{W}_{\geq 2}'})\subseteq E_{\geq2}^b(H_{\mathcal{W}_{\geq 2}'}) \subseteq E_{\geq2}^b(H_{\mathcal{W}_{\geq 2}})$ and since $\big\{E_{\geq2}^{b,i}(H_{\mathcal{W}_{\geq 2}}),E_{\geq2}^{b,o}(H_{\mathcal{W}_{\geq 2}})\big\}$ is a partition of $E_{\geq2}^b(H_{\mathcal{W}_{\geq 2}})$, we have
\begin{align*}
E_{\geq2}^{b,o}(H_{\mathcal{W}_{\geq 2}'})&=E_{\geq2}^{b,o}(H_{\mathcal{W}_{\geq 2}'})\cap E_{\geq2}^{b,o}(H_{\mathcal{W}_{\geq 2}})\\
&=\left(E_{\geq2}^{b,o}(H_{\mathcal{W}_{\geq 2}'})\cap E_{\geq2}^{b,i}(H_{\mathcal{W}_{\geq 2}})\right)\cup \left(E_{\geq2}^{b,o}(H_{\mathcal{W}_{\geq 2}'})\cap E_{\geq2}^{b,o}(H_{\mathcal{W}_{\geq 2}})\right).
\end{align*}

Therefore,
\begin{align*}
\frac{K_{\mathcal{W}_{\geq 2}'}(\mathbf{x})}{K_{\mathcal{W}_{\geq 2}}(\mathbf{x})}&\leq  \left(1+\frac{1}{\sqrt{n}}\right)\cdot n^{-\frac{1}{4}|E_{\geq2}^{b,o}(H_{\mathcal{W}_{\geq 2}'})\cap E_{\geq2}^{b,i}(H_{\mathcal{W}_{\geq 2}})|}\cdot\left(\frac{3 d}{n}\right)^s\cdot\left(\frac{\epsilon d}{2n}\right)^{|E_1(H_{\mathcal{W}_{\geq 2}'})|-|E_1(H_{\mathcal{W}_{\geq 2}})|}\\
&\quad\quad\quad\quad\quad
\times\left(\frac{2d}{n^{\frac{3}{2}}}\right)^{|E_{\geq2}^{b,i}(H_{\mathcal{W}_{\geq 2}'})|-|E_{\geq2}^{b,i}(H_{\mathcal{W}_{\geq 2}})\setminus E_{\geq2}^a(H_{\mathcal{W}_{\geq 2}'})|}\cdot\left(\frac{2d}{n^{\frac{5}{4}}}\right)^{|E_{\geq2}^{b,o}(H_{\mathcal{W}_{\geq 2}'})\cap E_{\geq2}^{b,i}(H_{\mathcal{W}_{\geq 2}})|}\\
&\quad\quad\quad\quad\quad
\times
\frac{\left(\frac{2d}{n^{\frac{5}{4}}}\right)^{|E_{\geq2}^{b,o}(H_{\mathcal{W}_{\geq 2}'})\cap E_{\geq2}^{b,o}(H_{\mathcal{W}_{\geq 2}})|-|E_{\geq2}^{b,o}(H_{\mathcal{W}_{\geq 2}})\setminus E_{\geq2}^a(H_{\mathcal{W}_{\geq 2}'})|}}{\resizebox{0.17\textwidth}{!}{$\displaystyle\prod_{uv\in E_{\geq2}^a(H_{\mathcal{W}_{\geq 2}})\setminus E_{\geq2}^a(H_{\mathcal{W}_{\geq 2}'})}$}\left[\left(1+\frac{\epsilon \mathbf{x}_u\mathbf{x}_v}{2}\right)\frac{d}{n}+\frac{d^2}{n^2}\right]}\\
&=  \left(1+\frac{1}{\sqrt{n}}\right)\cdot\left(\frac{3 d}{n}\right)^s\cdot\left(\frac{\epsilon d}{2n}\right)^{|E_1(H_{\mathcal{W}_{\geq 2}'})|-|E_1(H_{\mathcal{W}_{\geq 2}})|}\cdot\left(\frac{2d}{n^{\frac{3}{2}}}\right)^{|E_{\geq2}^{b,i}(H_{\mathcal{W}_{\geq 2}'})|-|E_{\geq2}^{b,i}(H_{\mathcal{W}_{\geq 2}})\setminus E_{\geq2}^a(H_{\mathcal{W}_{\geq 2}'})|}\\
&\quad\quad
\times
\left(\frac{2d}{n^{\frac{3}{2}}}\right)^{|E_{\geq2}^{b,o}(H_{\mathcal{W}_{\geq 2}'})\cap E_{\geq2}^{b,i}(H_{\mathcal{W}_{\geq 2}})|}\cdot\frac{\left(\frac{2d}{n^{\frac{5}{4}}}\right)^{|E_{\geq2}^{b,o}(H_{\mathcal{W}_{\geq 2}'})\cap E_{\geq2}^{b,o}(H_{\mathcal{W}_{\geq 2}})|-|E_{\geq2}^{b,o}(H_{\mathcal{W}_{\geq 2}})\setminus E_{\geq2}^a(H_{\mathcal{W}_{\geq 2}'})|}}{\resizebox{0.17\textwidth}{!}{$\displaystyle\prod_{uv\in E_{\geq2}^a(H_{\mathcal{W}_{\geq 2}})\setminus E_{\geq2}^a(H_{\mathcal{W}_{\geq 2}'})}$}\left[\left(1+\frac{\epsilon \mathbf{x}_u\mathbf{x}_v}{2}\right)\frac{d}{n}+\frac{d^2}{n^2}\right]}.
\end{align*}
Now since $E_{\geq2}^{b,o}(H_{\mathcal{W}_{\geq 2}'})\cap E_{\geq2}^{b,o}(H_{\mathcal{W}_{\geq 2}})\subseteq E_{\geq2}^{b,o}(H_{\mathcal{W}_{\geq 2}})$ and $E_{\geq2}^{b,o}(H_{\mathcal{W}_{\geq 2}'})\cap E_{\geq2}^{b,o}(H_{\mathcal{W}_{\geq 2}})\cap E_{\geq2}^a(H_{\mathcal{W}_{\geq 2}'})=\varnothing$, we have
$$E_{\geq2}^{b,o}(H_{\mathcal{W}_{\geq 2}'})\cap E_{\geq2}^{b,o}(H_{\mathcal{W}_{\geq 2}})\subseteq E_{\geq2}^{b,o}(H_{\mathcal{W}_{\geq 2}})\setminus E_{\geq2}^a(H_{\mathcal{W}_{\geq 2}'}),$$
and so
$$|E_{\geq2}^{b,o}(H_{\mathcal{W}_{\geq 2}'})\cap E_{\geq2}^{b,o}(H_{\mathcal{W}_{\geq 2}})|- |E_{\geq2}^{b,o}(H_{\mathcal{W}_{\geq 2}})\setminus E_{\geq2}^a(H_{\mathcal{W}_{\geq 2}'})|\leq 0.$$
Hence, for $n$ large enough, we have
\begin{align*}
\frac{K_{\mathcal{W}_{\geq 2}'}(\mathbf{x})}{K_{\mathcal{W}_{\geq 2}}(\mathbf{x})}&\leq  \left(1+\frac{1}{\sqrt{n}}\right)\cdot\left(\frac{3 d}{n}\right)^s\cdot\left(\frac{\epsilon d}{2n}\right)^{|E_1(H_{\mathcal{W}_{\geq 2}'})|-|E_1(H_{\mathcal{W}_{\geq 2}})|}\cdot\left(\frac{2d}{n^{\frac{3}{2}}}\right)^{|E_{\geq2}^{b,i}(H_{\mathcal{W}_{\geq 2}'})|-|E_{\geq2}^{b,i}(H_{\mathcal{W}_{\geq 2}})\setminus E_{\geq2}^a(H_{\mathcal{W}_{\geq 2}'})|}\\
&\quad\quad
\times
\left(\frac{2d}{n^{\frac{3}{2}}}\right)^{|E_{\geq2}^{b,o}(H_{\mathcal{W}_{\geq 2}'})\cap E_{\geq2}^{b,i}(H_{\mathcal{W}_{\geq 2}})|}\cdot\frac{\left(\frac{2d}{n^{\frac{3}{2}}}\right)^{|E_{\geq2}^{b,o}(H_{\mathcal{W}_{\geq 2}'})\cap E_{\geq2}^{b,o}(H_{\mathcal{W}_{\geq 2}})|-|E_{\geq2}^{b,o}(H_{\mathcal{W}_{\geq 2}})\setminus E_{\geq2}^a(H_{\mathcal{W}_{\geq 2}'})|}}{\resizebox{0.17\textwidth}{!}{$\displaystyle\prod_{uv\in E_{\geq2}^a(H_{\mathcal{W}_{\geq 2}})\setminus E_{\geq2}^a(H_{\mathcal{W}_{\geq 2}'})}$}\left[\left(1+\frac{\epsilon \mathbf{x}_u\mathbf{x}_v}{2}\right)\frac{d}{n}+\frac{d^2}{n^2}\right]}.
\end{align*}

Now observe that
\begin{align*}
|&E_{\geq2}^b(H_{\mathcal{W}_{\geq 2}})\setminus E_{\geq 2}(H_{\mathcal{W}_{\geq 2}'})|\\
&=\big|E_{\geq2}^b(H_{\mathcal{W}_{\geq 2}})\setminus \big(E_{\geq2}^a(H_{\mathcal{W}_{\geq 2}'})\cup E_{\geq2}^b(H_{\mathcal{W}_{\geq 2}'})\big)\big|\\
&=\big|\big(E_{\geq2}^b(H_{\mathcal{W}_{\geq 2}})\setminus E_{\geq2}^a(H_{\mathcal{W}_{\geq 2}'})\big)\setminus E_{\geq2}^b(H_{\mathcal{W}_{\geq 2}'})\big|\\
&\stackrel{(\dagger)}{=}|E_{\geq2}^b(H_{\mathcal{W}_{\geq 2}})\setminus E_{\geq2}^a(H_{\mathcal{W}_{\geq 2}'})|- |E_{\geq2}^b(H_{\mathcal{W}_{\geq 2}'})|\\
&=|E_{\geq2}^{b,i}(H_{\mathcal{W}_{\geq 2}})\setminus E_{\geq2}^a(H_{\mathcal{W}_{\geq 2}'})|+|E_{\geq2}^{b,o}(H_{\mathcal{W}_{\geq 2}})\setminus E_{\geq2}^a(H_{\mathcal{W}_{\geq 2}'})| -|E_{\geq2}^{b,i}(H_{\mathcal{W}_{\geq 2}'})|-|E_{\geq2}^{b,o}(H_{\mathcal{W}_{\geq 2}'})|\\
&=|E_{\geq2}^{b,i}(H_{\mathcal{W}_{\geq 2}})\setminus E_{\geq2}^a(H_{\mathcal{W}_{\geq 2}'})|+|E_{\geq2}^{b,o}(H_{\mathcal{W}_{\geq 2}})\setminus E_{\geq2}^a(H_{\mathcal{W}_{\geq 2}'})|\\
&\quad\quad\quad\quad\quad\quad -|E_{\geq2}^{b,i}(H_{\mathcal{W}_{\geq 2}'})|-|E_{\geq2}^{b,o}(H_{\mathcal{W}_{\geq 2}'})\cap E_{\geq2}^{b,i}(H_{\mathcal{W}_{\geq 2}})|-|E_{\geq2}^{b,o}(H_{\mathcal{W}_{\geq 2}'})\cap E_{\geq2}^{b,o}(H_{\mathcal{W}_{\geq 2}})|,
\end{align*}
where $(\dagger)$ follows from the fact that $E_{\geq2}^b(H_{\mathcal{W}_{\geq 2}'})\subseteq E_{\geq2}^b(H_{\mathcal{W}_{\geq 2}})$ and $E_{\geq2}^b(H_{\mathcal{W}_{\geq 2}'})\cap E_{\geq2}^a(H_{\mathcal{W}_{\geq 2}'})=\varnothing$, which imply that $E_{\geq2}^b(H_{\mathcal{W}_{\geq 2}'})\subseteq E_{\geq2}^b(H_{\mathcal{W}_{\geq 2}})\setminus E_{\geq2}^a(H_{\mathcal{W}_{\geq 2}'})$. Therefore,

\begin{align*}
\frac{K_{\mathcal{W}_{\geq 2}'}(\mathbf{x})}{K_{\mathcal{W}_{\geq 2}}(\mathbf{x})}&\leq  \left(1+\frac{1}{\sqrt{n}}\right)\cdot\left(\frac{3 d}{n}\right)^s\cdot\left(\frac{\epsilon d}{2n}\right)^{|E_1(H_{\mathcal{W}_{\geq 2}'})|-|E_1(H_{\mathcal{W}_{\geq 2}})|}\cdot\frac{\left(\frac{2d}{n^{\frac{3}{2}}}\right)^{-|E_{\geq2}^b(H_{\mathcal{W}_{\geq 2}})\setminus E_{\geq 2}(H_{\mathcal{W}_{\geq 2}'})|}}{\resizebox{0.17\textwidth}{!}{$\displaystyle\prod_{uv\in E_{\geq2}^a(H_{\mathcal{W}_{\geq 2}})\setminus E_{\geq2}^a(H_{\mathcal{W}_{\geq 2}'})}$}\left[\left(1+\frac{\epsilon \mathbf{x}_u\mathbf{x}_v}{2}\right)\frac{d}{n}+\frac{d^2}{n^2}\right]}.
\end{align*}

Now since $E_{\geq2}^b(H_{\mathcal{W}_{\geq 2}'})\subseteq E_{\geq2}^b(H_{\mathcal{W}_{\geq 2}})$, we have
\begin{align*}
\big(E_{\geq2}^a(H_{\mathcal{W}_{\geq 2}})\setminus E_{\geq2}^a(H_{\mathcal{W}_{\geq 2}'})\big)\cap E_{\geq2}^b(H_{\mathcal{W}_{\geq 2}'})&\subseteq\big(E_{\geq2}^a(H_{\mathcal{W}_{\geq 2}})\setminus E_{\geq2}^a(H_{\mathcal{W}_{\geq 2}'})\big)\cap E_{\geq2}^b(H_{\mathcal{W}_{\geq 2}})\\
&\subseteq E_{\geq2}^a(H_{\mathcal{W}_{\geq 2}})\cap E_{\geq2}^b(H_{\mathcal{W}_{\geq 2}})=\varnothing,
\end{align*}
hence,
\begin{align*}
E_{\geq2}^a(H_{\mathcal{W}_{\geq 2}})\setminus E_{\geq2}^a(H_{\mathcal{W}_{\geq 2}'})&= \big(E_{\geq2}^a(H_{\mathcal{W}_{\geq 2}})\setminus E_{\geq2}^a(H_{\mathcal{W}_{\geq 2}'})\big)\setminus E_{\geq2}^b(H_{\mathcal{W}_{\geq 2}'})\\
&= E_{\geq2}^a(H_{\mathcal{W}_{\geq 2}})\setminus \big(E_{\geq2}^a(H_{\mathcal{W}_{\geq 2}'})\cup E_{\geq2}^b(H_{\mathcal{W}_{\geq 2}'})\big)\\
&= E_{\geq2}^a(H_{\mathcal{W}_{\geq 2}})\setminus E_{\geq 2}(H_{\mathcal{W}_{\geq 2}'}).
\end{align*}
Therefore, 
\begin{equation}
\label{eq:eq_lem_K_W_2_Comparison_4}
\begin{aligned}
\frac{K_{\mathcal{W}_{\geq 2}'}(\mathbf{x})}{K_{\mathcal{W}_{\geq 2}}(\mathbf{x})}&\leq  \frac{\left(1+\frac{1}{\sqrt{n}}\right)\cdot\left(\frac{3 d}{n}\right)^s\cdot\left(\frac{\epsilon d}{2n}\right)^{|E_1(H_{\mathcal{W}_{\geq 2}'})|-|E_1(H_{\mathcal{W}_{\geq 2}})|}}{\left(\frac{2d}{n^{\frac{3}{2}}}\right)^{|E_{\geq2}^b(H_{\mathcal{W}_{\geq 2}})\setminus E_{\geq 2}(H_{\mathcal{W}_{\geq 2}'})|}\cdot \resizebox{0.17\textwidth}{!}{$\displaystyle\prod_{uv\in E_{\geq2}^a(H_{\mathcal{W}_{\geq 2}})\setminus E_{\geq 2}(H_{\mathcal{W}_{\geq 2}'})}$}\left[\left(1+\frac{\epsilon \mathbf{x}_u\mathbf{x}_v}{2}\right)\frac{d}{n}+\frac{d^2}{n^2}\right]}\\
&\leq  \frac{\left(1+\frac{1}{\sqrt{n}}\right)\cdot\left(\frac{3 d}{n}\right)^{s+|E_1(H_{\mathcal{W}_{\geq 2}'})|-|E_1(H_{\mathcal{W}_{\geq 2}})|}}{\left(\frac{2d}{n^{\frac{3}{2}}}\right)^{|E_{\geq2}^b(H_{\mathcal{W}_{\geq 2}})\setminus E_{\geq 2}(H_{\mathcal{W}_{\geq 2}'})|}\cdot \resizebox{0.17\textwidth}{!}{$\displaystyle\prod_{uv\in E_{\geq2}^a(H_{\mathcal{W}_{\geq 2}})\setminus E_{\geq 2}(H_{\mathcal{W}_{\geq 2}'})}$}\left[\left(1+\frac{\epsilon \mathbf{x}_u\mathbf{x}_v}{2}\right)\frac{d}{n}+\frac{d^2}{n^2}\right]}.
\end{aligned}
\end{equation}

Now notice that $H_{\mathcal{W}_{\geq 2}}$ is obtained from $H_{\mathcal{W}_{\geq 2}'}$ by adding the self-avoiding-walk $W$ of length $s$. This means that
\begin{align*}
E_1(H_{\mathcal{W}_{\geq 2}})&=\big(E_1(H_{\mathcal{W}_{\geq 2}'})\setminus W\big)\cup\big(W\setminus E(H_{\mathcal{W}_{\geq 2}'})\big).
\end{align*}

Therefore, 
\begin{align*}
s+|E_1(H_{\mathcal{W}_{\geq 2}'})|-|E_1(H_{\mathcal{W}_{\geq 2}})|&=|W|+|E_1(H_{\mathcal{W}_{\geq 2}'})|-|E_1(H_{\mathcal{W}_{\geq 2}'})\setminus W|-|W\setminus E(H_{\mathcal{W}_{\geq 2}'})|\\
&= \big|E_1(H_{\mathcal{W}_{\geq 2}'})\cap W\big|+\big|E(H_{\mathcal{W}_{\geq 2}'})\cap W\big|\\
&= \big|E_1(H_{\mathcal{W}_{\geq 2}'})\cap W\big|+\big|E_1(H_{\mathcal{W}_{\geq 2}'})\cap W\big|+\big|E_{\geq 2}(H_{\mathcal{W}_{\geq 2}'})\cap W\big|\\
&= 2\cdot\big|E_1(H_{\mathcal{W}_{\geq 2}'})\cap W\big|+\big|E_{\geq 2}(H_{\mathcal{W}_{\geq 2}'})\cap W\big|.
\end{align*}
Now it is easy to see that
\begin{align*}
E_1(H_{\mathcal{W}_{\geq 2}'})\cap W&=E_{\geq 2}(H_{\mathcal{W}_{\geq 2}})\setminus E_{\geq 2}(H_{\mathcal{W}_{\geq 2}'})\\
&=\big(E_{\geq2}^a(H_{\mathcal{W}_{\geq 2}})\setminus E_{\geq 2}(H_{\mathcal{W}_{\geq 2}'})\big)\cup \big(E_{\geq2}^b(H_{\mathcal{W}_{\geq 2}})\setminus E_{\geq 2}(H_{\mathcal{W}_{\geq 2}'})\big).
\end{align*}
Thus,
\begin{align*}
s+|E_1(H_{\mathcal{W}_{\geq 2}'})|-|E_1(H_{\mathcal{W}_{\geq 2}})|= 2\cdot\big|E_{\geq2}^a(H_{\mathcal{W}_{\geq 2}})\setminus E_{\geq 2}(H_{\mathcal{W}_{\geq 2}'})\big|+2\cdot\big|E_{\geq2}^b&(H_{\mathcal{W}_{\geq 2}})\setminus E_{\geq 2}(H_{\mathcal{W}_{\geq 2}'})\big|\\
&\;\;+\big|E_{\geq 2}(H_{\mathcal{W}_{\geq 2}'})\cap W\big|.
\end{align*}

By combining this with \cref{eq:eq_lem_K_W_2_Comparison_4}, it follows that for $n$ large enough, we have

\begin{align*}
&\frac{K_{\mathcal{W}_{\geq 2}'}(\mathbf{x})}{K_{\mathcal{W}_{\geq 2}}(\mathbf{x})}\\
&\leq  \frac{\left(1+\frac{1}{\sqrt{n}}\right)\cdot\left(\frac{3 d}{n}\right)^{2|E_{\geq2}^a(H_{\mathcal{W}_{\geq 2}})\setminus E_{\geq 2}(H_{\mathcal{W}_{\geq 2}'})|+2|E_{\geq2}^b(H_{\mathcal{W}_{\geq 2}})\setminus E_{\geq 2}(H_{\mathcal{W}_{\geq 2}'})|+|E_{\geq 2}(H_{\mathcal{W}_{\geq 2}'})\cap W|}}{\left(\frac{2d}{n^{\frac{3}{2}}}\right)^{|E_{\geq2}^b(H_{\mathcal{W}_{\geq 2}})\setminus E_{\geq 2}(H_{\mathcal{W}_{\geq 2}'})|}\cdot \resizebox{0.17\textwidth}{!}{$\displaystyle\prod_{uv\in E_{\geq2}^a(H_{\mathcal{W}_{\geq 2}})\setminus E_{\geq 2}(H_{\mathcal{W}_{\geq 2}'})}$}\left[\left(1+\frac{\epsilon \mathbf{x}_u\mathbf{x}_v}{2}\right)\frac{d}{n}+\frac{d^2}{n^2}\right]}\\
&=  \frac{\left(1+\frac{1}{\sqrt{n}}\right)\cdot \left(\frac{3 d}{n}\right)^{|E_{\geq 2}(H_{\mathcal{W}_{\geq 2}'})\cap W|}}{\left(\frac{2\sqrt{n}}{9d}\right)^{|E_{\geq2}^b(H_{\mathcal{W}_{\geq 2}})\setminus E_{\geq 2}(H_{\mathcal{W}_{\geq 2}'})|}\cdot \left(\frac{n^2}{9d^2}\right)^{|E_{\geq2}^a(H_{\mathcal{W}_{\geq 2}})\setminus E_{\geq 2}(H_{\mathcal{W}_{\geq 2}'})|}\cdot\resizebox{0.17\textwidth}{!}{$\displaystyle\prod_{uv\in E_{\geq2}^a(H_{\mathcal{W}_{\geq 2}})\setminus E_{\geq 2}(H_{\mathcal{W}_{\geq 2}'})}$}\left[\left(1+\frac{\epsilon \mathbf{x}_u\mathbf{x}_v}{2}\right)\frac{d}{n}+\frac{d^2}{n^2}\right]}.
\end{align*}

Now since $$|E_{\geq 2}(H_{\mathcal{W}_{\geq 2}})\setminus E_{\geq 2}(H_{\mathcal{W}_{\geq 2}'})|=|E_{\geq2}^a(H_{\mathcal{W}_{\geq 2}})\setminus E_{\geq 2}(H_{\mathcal{W}_{\geq 2}'})|+|E_{\geq2}^b(H_{\mathcal{W}_{\geq 2}})\setminus E_{\geq 2}(H_{\mathcal{W}_{\geq 2}'})|,$$ we get
\begin{align*}
&\frac{K_{\mathcal{W}_{\geq 2}'}(\mathbf{x})}{K_{\mathcal{W}_{\geq 2}}(\mathbf{x})}\\
&\leq \resizebox{0.95\textwidth}{!}{$\displaystyle  \frac{\left(1+\frac{1}{\sqrt{n}}\right)\cdot \left(\frac{3 d}{n}\right)^{|E_{\geq 2}(H_{\mathcal{W}_{\geq 2}'})\cap W|}}{\left(\frac{2\sqrt{n}}{9 d}\right)^{|E_{\geq 2}(H_{\mathcal{W}_{\geq 2}})\setminus E_{\geq 2}(H_{\mathcal{W}_{\geq 2}'})|}\cdot \left(\frac{n\sqrt{n}}{2d}\right)^{|E_{\geq2}^a(H_{\mathcal{W}_{\geq 2}})\setminus E_{\geq 2}(H_{\mathcal{W}_{\geq 2}'})|}\cdot\resizebox{0.17\textwidth}{!}{$\displaystyle\prod_{uv\in E_{\geq2}^a(H_{\mathcal{W}_{\geq 2}})\setminus E_{\geq 2}(H_{\mathcal{W}_{\geq 2}'})}$}\left[\left(1+\frac{\epsilon \mathbf{x}_u\mathbf{x}_v}{2}\right)\frac{d}{n}+\frac{d^2}{n^2}\right]}$}.
\end{align*}

\end{proof}

The following lemma proves an upper bound on $K_{\mathcal{W}_{\geq 2}}(\mathbf{x})$ for every $\mathcal{W}_{\geq 2}\subsetneq\mathcal{W}_{\geq 2}(H)$.

\begin{lemma}
\label{lem:lem_K_W_2_Upper_Bound}
If $H\in\bsaw{s}{t}$ and $n$ is large enough, then for every $\mathcal{W}_{\geq 2}\subsetneq\mathcal{W}_{\geq 2}(H)$, we have

\begin{align*}
K_{\mathcal{W}_{\geq 2}}(\mathbf{x})&\leq  \frac{9 d}{\sqrt{n}}\cdot \hat{K}_{\mathcal{W}_{\geq 2}(H)}(\mathbf{x}),
\end{align*}
where
\begin{equation}
\label{eq:eq_K_tilde_W_2}
\resizebox{0.97\textwidth}{!}{$\displaystyle
\hat{K}_{\mathcal{W}_{\geq 2}(H)}(\mathbf{x})=  \frac{\left(\frac{\epsilon d}{2n}\right)^{|E_1(H_{\mathcal{W}_{\geq 2}(H)})|}}{\resizebox{0.14\textwidth}{!}{$\displaystyle\prod_{v\in\mathcal{L}_{\geq2}(H_{\mathcal{W}_{\geq 2}(H)})}$} n^{\frac{1}{4}\left(d^{H_{\mathcal{W}_{\geq 2}(H)}}_{\geq 2}(v)-\Delta\right)}}\cdot \left(\frac{2d}{n}\right)^{|E_{\geq2}^b(H_{\mathcal{W}_{\geq 2}(H)})|}\cdot\prod_{uv\in E_{\geq2}^a(H_{\mathcal{W}_{\geq 2}(H)})}\left[\left(1+\frac{\epsilon \mathbf{x}_u\mathbf{x}_v}{2}\right)\frac{d}{n}+\frac{3d^2}{n\sqrt{n}}\right].$}
\end{equation}
\end{lemma}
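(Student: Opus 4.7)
The plan is to prove \cref{lem:lem_K_W_2_Upper_Bound} by iteratively applying the one-step comparison \cref{lem:lem_K_W_2_Comparison}, telescoping the resulting product over a chain of intermediate walk-sets that interpolates between $\mathcal{W}_{\geq 2}$ and $\mathcal{W}_{\geq 2}(H)$. Since $\mathcal{W}_{\geq 2}\subsetneq\mathcal{W}_{\geq 2}(H)$, we can enumerate $\mathcal{W}_{\geq 2}(H)\setminus\mathcal{W}_{\geq 2}=\{W_1,\ldots,W_k\}$ with $k\ge 1$, set $\mathcal{W}^{(0)}:=\mathcal{W}_{\geq 2}$ and $\mathcal{W}^{(i)}:=\mathcal{W}^{(i-1)}\cup\{W_i\}$ for $i=1,\ldots,k$, and write
\[
K_{\mathcal{W}_{\geq 2}}(\mathbf{x})
\;=\;K_{\mathcal{W}^{(k)}}(\mathbf{x})\;\prod_{i=1}^{k}\frac{K_{\mathcal{W}^{(i-1)}}(\mathbf{x})}{K_{\mathcal{W}^{(i)}}(\mathbf{x})}.
\]
\cref{lem:lem_K_W_2_Comparison} gives a quantitative bound on each factor in the product.

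Next I would track how each edge $uv\in E(H_{\mathcal{W}_{\geq 2}(H)})$ is "charged" across the telescoping. Every such edge enters $E_{\geq 2}$ at exactly one step of the chain (possibly at step $0$, if it already lies in $E_{\geq 2}(H_{\mathcal{W}_{\geq 2}})$); afterwards its multiplicity can only grow, contributing at most a $(3d/n)^{m}$ factor from the numerator of \cref{lem:lem_K_W_2_Comparison} at subsequent visits by walks. Multiplying over $i$, the edges that end up in $E_{\geq 2}^a(H_{\mathcal{W}_{\geq 2}(H)})$ combine to produce exactly the factor $\prod_{uv\in E_{\geq 2}^a(H_{\mathcal{W}_{\geq 2}(H)})}[(1+\tfrac{\epsilon\mathbf{x}_u\mathbf{x}_v}{2})\tfrac{d}{n}+\tfrac{3d^2}{n\sqrt n}]$ appearing in $\hat K_{\mathcal{W}_{\geq 2}(H)}(\mathbf{x})$ (the $3d^2/(n\sqrt n)$ term and the factor $2$ on $d/n$ for $b$-edges absorb the cumulative $(1+\tfrac{1}{\sqrt n})^k$ slack and the residual $(3d/n)^{\,\text{repeat}}$ contributions, using $k\le t=K\log n$). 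The $\mathcal{L}_{\geq 2}$ suppression factor in $\hat K$ matches $K_{\mathcal{W}_{\geq 2}(H)}$ directly.

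The key quantitative point is the first-time appearance of a $\geq 2$-edge. Every step $i$ with $\mathcal{W}^{(i-1)}\ne\mathcal{W}^{(i)}$ for which some new edge of $E_{\geq 2}(H_{\mathcal{W}_{\geq 2}(H)})$ is first "created" gives, by \cref{lem:lem_K_W_2_Comparison}, a denominator factor of at least $\min\{2\sqrt n/(9d),\,n\sqrt n/(2d)\}\cdot[(1+\tfrac{\epsilon}{2})\tfrac{d}{n}]=\Theta(\sqrt n/d)\cdot(d/n)$ that is not already accounted for in the expression of $\hat K_{\mathcal{W}_{\geq 2}(H)}(\mathbf{x})$. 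Since $\mathcal{W}_{\geq 2}\subsetneq\mathcal{W}_{\geq 2}(H)$ and each walk in $\mathcal{W}_{\geq 2}(H)$ has at least one edge of multiplicity $\ge 2$ in $H_{\mathcal{W}_{\geq 2}(H)}$, there must be at least one such "first appearance" step, and this yields the overall prefactor of $9d/\sqrt n$.

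The main obstacle will be the bookkeeping: edges can, as walks are added, transition from absent, to $E_1$, to $E_{\geq 2}^a$, to $E_{\geq 2}^b$ (the last happening once one endpoint becomes $(\ge 2)$-large), and each transition pays a different price in \cref{lem:lem_K_W_2_Comparison}. The cleanest way to handle this is to partition the edges of $E(H_{\mathcal{W}_{\geq 2}(H)})\setminus E(H_{\mathcal{W}_{\geq 2}})$ according to the step and category in which they first enter $E_{\geq 2}$, then show, set by set, that their aggregate contribution to $\prod_i K_{\mathcal{W}^{(i-1)}}/K_{\mathcal{W}^{(i)}}$ is dominated by the ratio $\hat K_{\mathcal{W}_{\geq 2}(H)}(\mathbf{x})/K_{\mathcal{W}_{\geq 2}(H)}(\mathbf{x})$, with a single leftover factor of $9d/\sqrt n$ coming from the first new $\ge 2$-edge. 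Once this decomposition is set up, the estimate follows by multiplying the telescoped bound by $K_{\mathcal{W}^{(k)}}(\mathbf{x})=K_{\mathcal{W}_{\geq 2}(H)}(\mathbf{x})$ and comparing term by term with $\hat K_{\mathcal{W}_{\geq 2}(H)}(\mathbf{x})$ as defined in \cref{eq:eq_K_tilde_W_2}.
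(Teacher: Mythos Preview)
Your overall strategy---telescoping along a chain $\mathcal{W}^{(0)}\subset\cdots\subset\mathcal{W}^{(k)}=\mathcal{W}_{\geq 2}(H)$ and applying \cref{lem:lem_K_W_2_Comparison} at each step---is exactly what the paper does, and the bookkeeping you outline for how the per-edge factors accumulate into $\hat K_{\mathcal{W}_{\geq 2}(H)}(\mathbf{x})$ is on the right track.

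There is, however, a genuine gap in your identification of where the prefactor $9d/\sqrt{n}$ comes from. You claim it arises because ``there must be at least one such `first appearance' step,'' i.e.\ some step at which a new edge of $E_{\geq 2}(H_{\mathcal{W}_{\geq 2}(H)})$ is created. This is false in general. Take three walks $W_1,W_2,W_3\in\mathcal{W}_{\geq 2}(H)$ that all share a single edge $e$ (of multiplicity~$3$ in $H$) and are otherwise edge-disjoint, and set $\mathcal{W}_{\geq 2}=\{W_1,W_2\}$. Then $E_{\geq 2}(H_{\mathcal{W}_{\geq 2}})=\{e\}=E_{\geq 2}(H_{\mathcal{W}_{\geq 2}(H)})$, so no new $\geq 2$-edge is ever created when you add $W_3$; your argument would then fail to produce any $9d/\sqrt{n}$ factor.

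What actually gives the prefactor---and what the paper uses---is that the \emph{last} added walk $W_k$ must satisfy $|E(H_{\mathcal{W}^{(k-1)}})\cap W_k|\geq 1$: since $W_k\in\mathcal{W}_{\geq 2}(H)$, some edge of $W_k$ has multiplicity $\geq 2$ in $H$, and the other walk containing that edge necessarily lies in $\mathcal{W}_{\geq 2}(H)\setminus\{W_k\}=\mathcal{W}^{(k-1)}$. This intersection contributes to the exponent of $(9d/(2\sqrt{n}))$ in the telescoped bound \emph{either} via the denominator term $(2\sqrt{n}/(9d))^{|E_{\geq 2}(H_{\mathcal{W}^{(k)}})\setminus E_{\geq 2}(H_{\mathcal{W}^{(k-1)}})|}$ (if the edge was in $E_1$) \emph{or} via the numerator term $(3d/n)^{|E_{\geq 2}(H_{\mathcal{W}^{(k-1)}})\cap W_k|}$ (if it was already in $E_{\geq 2}$). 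Either way the combined exponent is at least~$1$, and that is what yields the $9d/\sqrt{n}$. You should replace your ``first appearance'' paragraph with this two-case argument.
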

\begin{proof}
Let $k=|\mathcal{W}_{\geq 2}(H)\setminus \mathcal{W}_2|$, and let $W_1,\ldots,W_k\in \mathcal{W}_{\geq 2}(H)$ be such that
$$\mathcal{W}_{\geq 2}(H)\setminus \mathcal{W}_2=\big\{W_1,\ldots,W_k\big\}.$$
Now define $\mathcal{W}_{\geq 2}^{(0)}=\mathcal{W}_{\geq 2}$ and for every $1\leq m \leq k$, define
$$\mathcal{W}_{\geq 2}^{(m)}=\mathcal{W}_{\geq 2}\cup \big\{W_1,\ldots,W_m\big\}.$$
Clearly, $\mathcal{W}_{\geq 2}^{(k)}=\mathcal{W}_{\geq 2}(H)$, and for every $1\leq m\leq k$, we have $\mathcal{W}_{\geq 2}^{(m)}=\mathcal{W}_{\geq 2}^{(m-1)}\cup\{W_{m}\}$.

From \Cref{lem:lem_K_W_2_Comparison}, we know that for every $1\leq m\leq k$, we have

\begin{align*}
\frac{K_{\mathcal{W}_{\geq 2}^{(m-1)}}(\mathbf{x})}{K_{\mathcal{W}_{\geq 2}^{(m)}}(\mathbf{x})}&\leq \frac{\left(1+\frac{1}{\sqrt{n}}\right)\cdot \left(\frac{9 d}{2\sqrt{n}}\right)^{|E_{\geq 2}(H_{\mathcal{W}_{\geq 2}^{(m)}})\setminus E_{\geq 2}(H_{\mathcal{W}_{\geq 2}^{(m-1)}})|}\cdot \left(\frac{2 d}{n\sqrt{n}}\right)^{|E_{\geq2}^a(H_{\mathcal{W}_{\geq 2}^{(m)}})\setminus E_{\geq 2}(H_{\mathcal{W}_{\geq 2}^{(m-1)}})|}}{\left(\frac{n}{3 d}\right)^{|E_{\geq 2}(H_{\mathcal{W}_{\geq 2}^{(m-1)}})\cap W_m|}\cdot \resizebox{0.17\textwidth}{!}{$\displaystyle\prod_{uv\in E_{\geq2}^a(H_{\mathcal{W}_{\geq 2}^{(m)}})\setminus E_{\geq 2}(H_{\mathcal{W}_{\geq 2}^{(m-1)}})}$}\left[\left(1+\frac{\epsilon \mathbf{x}_u\mathbf{x}_v}{2}\right)\frac{d}{n}+\frac{d^2}{n^2}\right]}.
\end{align*}

Therefore,
\begin{equation}
\label{eq:eq_lem_K_W_2_Upper_Bound_eq_1}
\begin{aligned}
\frac{K_{\mathcal{W}_{\geq 2}}(\mathbf{x})}{K_{\mathcal{W}_{\geq 2}(H)}(\mathbf{x})}&=\frac{K_{\mathcal{W}_{\geq 2}^{(0)}}(\mathbf{x})}{K_{\mathcal{W}_{\geq 2}^{(k)}}(\mathbf{x})}=\prod_{m=1}^{k}\frac{K_{\mathcal{W}_{\geq 2}^{(m-1)}}(\mathbf{x})}{K_{\mathcal{W}_{\geq 2}^{(m)}}(\mathbf{x})}\\
&\leq \frac{\left(1+\frac{1}{\sqrt{n}}\right)^k\cdot \resizebox{0.026\textwidth}{!}{$\displaystyle\prod_{m=1}^k$}\left[\left(\frac{9 d}{2\sqrt{n}}\right)^{|E_{\geq 2}(H_{\mathcal{W}_{\geq 2}^{(m)}})\setminus E_{\geq 2}(H_{\mathcal{W}_{\geq 2}^{(m-1)}})|}\cdot \left(\frac{2d}{n\sqrt{n}}\right)^{|E_{\geq2}^a(H_{\mathcal{W}_{\geq 2}^{(m)}})\setminus E_{\geq 2}(H_{\mathcal{W}_{\geq 2}^{(m-1)}})|}\right]}{\resizebox{0.026\textwidth}{!}{$\displaystyle\prod_{m=1}^k$}\left[\left(\frac{n}{3 d}\right)^{|E_{\geq 2}(H_{\mathcal{W}_{\geq 2}^{(m-1)}})\cap W_m|}\cdot\;\;\resizebox{0.17\textwidth}{!}{$\displaystyle\prod_{uv\in E_{\geq2}^a(H_{\mathcal{W}_{\geq 2}^{(m)}})\setminus E_{\geq 2}(H_{\mathcal{W}_{\geq 2}^{(m-1)}})}$}\left[\left(1+\frac{\epsilon \mathbf{x}_u\mathbf{x}_v}{2}\right)\frac{d}{n}+\frac{d^2}{n^2}\right]\right]}.
\end{aligned}
\end{equation}

Now since $$\mathcal{W}_{\geq 2}^{(0)}\subseteq\mathcal{W}_{\geq 2}^{(1)}\subseteq\ldots\subseteq \mathcal{W}_{\geq 2}^{(m)},$$ we have
$$E_{\geq 2}(\mathcal{W}_{\geq 2}^{(0)})\subseteq E_{\geq 2}(\mathcal{W}_{\geq 2}^{(1)})\subseteq\ldots\subseteq E_{\geq 2}(\mathcal{W}_{\geq 2}^{(m)}),$$
and
\begin{equation}
\label{eq:eq_lem_K_W_2_Upper_Bound_eq_2}
\begin{aligned}
\prod_{m=1}^{k} \left(\frac{9d}{2\sqrt{n}}\right)^{|E_{\geq 2}(H_{\mathcal{W}_{\geq 2}^{(m)}})\setminus E_{\geq 2}(H_{\mathcal{W}_{\geq 2}^{(m-1)}})|}&= \prod_{m=1}^{k} \left(\frac{9d}{2\sqrt{n}}\right)^{|E_{\geq 2}(H_{\mathcal{W}_{\geq 2}^{(m)}})|- | E_{\geq 2}(H_{\mathcal{W}_{\geq 2}^{(m-1)}})|}\\
&= \left(\frac{9d}{2\sqrt{n}}\right)^{|E_{\geq 2}(H_{\mathcal{W}_{\geq 2}^{(k)}})|- | E_{\geq 2}(H_{\mathcal{W}_{\geq 2}^{(0)}})|}\\
&= \left(\frac{9d}{2\sqrt{n}}\right)^{|E_{\geq 2}(H_{\mathcal{W}_{\geq 2}(H)})|- | E_{\geq 2}(H_{\mathcal{W}_{\geq 2}})|}.
\end{aligned}
\end{equation}

On the other hand, we have
\begin{equation}
\label{eq:eq_lem_K_W_2_Upper_Bound_eq_3}
\left(1+\frac{1}{\sqrt{n}}\right)^k=1+O\left(\frac{k}{\sqrt{n}}\right)\leq 1+O\left(\frac{|\mathcal{W}_{\geq 2}(H)|}{\sqrt{n}}\right)\leq 1+O\left(\frac{|\mathcal{W}(H)|}{\sqrt{n}}\right) = 1+O\left(\frac{t}{\sqrt{n}}\right)\leq 2,
\end{equation}
where the last inequality is true for $n$ large enough.

Furthermore, from \Cref{def:def_K_W_2_Y}, we have
\begin{equation}
\label{eq:eq_K_W_2_H_Y_expression}
\begin{aligned}
K_{\mathcal{W}_{\geq 2}(H)}(\mathbf{x})&=\resizebox{0.82\textwidth}{!}{$\displaystyle\frac{\left(\frac{\epsilon d}{2n}\right)^{|E_1(H_{\mathcal{W}_{\geq 2}(H)})|}}{\resizebox{0.14\textwidth}{!}{$\displaystyle\prod_{v\in\mathcal{L}_{\geq2}(H_{\mathcal{W}_{\geq 2}(H)})}$} n^{\frac{1}{4}\left(d^{H_{\mathcal{W}_{\geq 2}(H)}}_{\geq 2}(v)-\Delta\right)}}\cdot\left(\frac{2d}{n}\right)^{|E_{\geq2}^b(H_{\mathcal{W}_{\geq 2}(H)})|}\cdot\prod_{uv\in E_{\geq2}^a(H_{\mathcal{W}_{\geq 2}(H)})}\left[\left(1+\frac{\epsilon \mathbf{x}_u\mathbf{x}_v}{2}\right)\frac{d}{n}+\frac{d^2}{n^2}\right]$}\\
&=R_{\mathcal{W}_{\geq 2}(H)}(\mathbf{x})\cdot\prod_{uv\in E_{\geq2}^a(H_{\mathcal{W}_{\geq 2}(H)})}\left[\left(1+\frac{\epsilon \mathbf{x}_u\mathbf{x}_v}{2}\right)\frac{d}{n}+\frac{d^2}{n^2}\right],
\end{aligned}
\end{equation}
where
$$R_{\mathcal{W}_{\geq 2}(H)}(\mathbf{x})=\frac{\left(\frac{\epsilon d}{2n}\right)^{|E_1(H_{\mathcal{W}_{\geq 2}(H)})|}}{\resizebox{0.09\textwidth}{!}{$\displaystyle\prod_{v\in\mathcal{L}_{\geq2}(H_{\mathcal{W}_{\geq 2}(H)})}$} n^{\frac{1}{4}\left(d^{H_{\mathcal{W}_{\geq 2}(H)}}_{\geq 2}(v)-\Delta\right)}}\cdot \left(\frac{2d}{n}\right)^{|E_{\geq2}^b(H_{\mathcal{W}_{\geq 2}(H)})|}.$$

Therefore,
\begin{align*}
&\frac{K_{\mathcal{W}_{\geq 2}(H)}(\mathbf{x})}{\resizebox{0.026\textwidth}{!}{$\displaystyle\prod_{m=1}^k$}\;\;\resizebox{0.17\textwidth}{!}{$\displaystyle\prod_{uv\in E_{\geq2}^a(H_{\mathcal{W}_{\geq 2}^{(m)}})\setminus E_{\geq 2}(H_{\mathcal{W}_{\geq 2}^{(m-1)}})}$}\left[\left(1+\frac{\epsilon \mathbf{x}_u\mathbf{x}_v}{2}\right)\frac{d}{n}+\frac{d^2}{n^2}\right]}\\
&\quad\quad\quad\quad\quad=R_{\mathcal{W}_{\geq 2}(H)}(\mathbf{x})\cdot\frac{\resizebox{0.085\textwidth}{!}{$\displaystyle\prod_{uv\in E_{\geq2}^a(H_{\mathcal{W}_{\geq 2}(H)})}$}\left[\left(1+\frac{\epsilon \mathbf{x}_u\mathbf{x}_v}{2}\right)\frac{d}{n}+\frac{d^2}{n^2}\right]}{\resizebox{0.026\textwidth}{!}{$\displaystyle\prod_{m=1}^k$}\;\;\resizebox{0.17\textwidth}{!}{$\displaystyle\prod_{uv\in E_{\geq2}^a(H_{\mathcal{W}_{\geq 2}^{(m)}})\setminus E_{\geq 2}(H_{\mathcal{W}_{\geq 2}^{(m-1)}})}$}\left[\left(1+\frac{\epsilon \mathbf{x}_u\mathbf{x}_v}{2}\right)\frac{d}{n}+\frac{d^2}{n^2}\right]}\\
&\quad\quad\quad\quad\quad=R_{\mathcal{W}_{\geq 2}(H)}(\mathbf{x})\cdot \prod_{uv\in E_{\geq2}^a(H_{\mathcal{W}_{\geq 2}(H)})\setminus\left[\resizebox{0.026\textwidth}{!}{$\displaystyle\bigcup_{m=1}^k$} \left(E_{\geq2}^a(H_{\mathcal{W}_{\geq 2}^{(m)}})\setminus E_{\geq 2}(H_{\mathcal{W}_{\geq 2}^{(m-1)}})\right)\right]}\left[\left(1+\frac{\epsilon \mathbf{x}_u\mathbf{x}_v}{2}\right)\frac{d}{n}+\frac{d^2}{n^2}\right].
\end{align*}
By combining this with \cref{eq:eq_lem_K_W_2_Upper_Bound_eq_1}, \cref{eq:eq_lem_K_W_2_Upper_Bound_eq_2}, \cref{eq:eq_lem_K_W_2_Upper_Bound_eq_3}, we get

\begin{align*}
K_{\mathcal{W}_{\geq 2}}(\mathbf{x})&\leq  2\cdot \left(\frac{9d}{2\sqrt{n}}\right)^{|E_{\geq 2}(H_{\mathcal{W}_{\geq 2}(H)})\setminus E_{\geq 2}(H_{\mathcal{W}_{\geq 2}})|}\cdot  R_{\mathcal{W}_{\geq 2}(H)}(\mathbf{x})\\
&\quad\quad\quad\quad\quad\quad\quad\quad\times\prod_{m=1}^k\left[ \left(\frac{3d}{n}\right)^{|E_{\geq 2}(H_{\mathcal{W}_{\geq 2}^{(m-1)}})\cap W_m|}\cdot\left(\frac{2d}{n\sqrt{n}}\right)^{|E_{\geq2}^a(H_{\mathcal{W}_{\geq 2}^{(m)}})\setminus E_{\geq 2}(H_{\mathcal{W}_{\geq 2}^{(m-1)}})|}\right]\\
&\quad\quad\quad\quad\quad\quad\quad\quad
\times\prod_{uv\in E_{\geq2}^a(H_{\mathcal{W}_{\geq 2}(H)})\setminus\left[\resizebox{0.026\textwidth}{!}{$\displaystyle\bigcup_{m=1}^k$} \left(E_{\geq2}^a(H_{\mathcal{W}_{\geq 2}^{(m)}})\setminus E_{\geq 2}(H_{\mathcal{W}_{\geq 2}^{(m-1)}})\right)\right]}\left[\left(1+\frac{\epsilon \mathbf{x}_u\mathbf{x}_v}{2}\right)\frac{d}{n}+\frac{d^2}{n^2}\right]\\
&\leq  2\cdot \left(\frac{9d}{2\sqrt{n}}\right)^{|E_{\geq 2}(H_{\mathcal{W}_{\geq 2}(H)})\setminus E_{\geq 2}(H_{\mathcal{W}_{\geq 2}})|}\cdot  R_{\mathcal{W}_{\geq 2}(H)}(\mathbf{x})\cdot\prod_{m=1}^k \left(\frac{3d}{n}\right)^{|E_{\geq 2}(H_{\mathcal{W}_{\geq 2}^{(m-1)}})\cap W_m|}\\
&\quad\quad\quad\quad\quad\quad\quad\quad\quad\quad\quad\quad\quad\quad\quad\quad\times\prod_{uv\in E_{\geq2}^a(H_{\mathcal{W}_{\geq 2}(H)})}\left[\left(1+\frac{\epsilon \mathbf{x}_u\mathbf{x}_v}{2}\right)\frac{d}{n}+\frac{d^2}{n^2}+\frac{2d}{n\sqrt{n}}\right]\\
&\leq  2\cdot \left(\frac{9d}{2\sqrt{n}}\right)^{|E_{\geq 2}(H_{\mathcal{W}_{\geq 2}(H)})\setminus E_{\geq 2}(H_{\mathcal{W}_{\geq 2}})|}\cdot  R_{\mathcal{W}_{\geq 2}(H)}(\mathbf{x})\cdot\prod_{m=1}^k \left(\frac{3d}{n}\right)^{|E_{\geq 2}(H_{\mathcal{W}_{\geq 2}^{(m-1)}})\cap W_m|}\\
&\quad\quad\quad\quad\quad\quad\quad\quad\quad\quad\quad\quad\quad\quad\quad\quad\times\prod_{uv\in E_{\geq2}^a(H_{\mathcal{W}_{\geq 2}(H)})}\left[\left(1+\frac{\epsilon \mathbf{x}_u\mathbf{x}_v}{2}\right)\frac{d}{n}+\frac{3d^2}{n\sqrt{n}}\right]\\
&\leq  2\cdot \left(\frac{9 d}{2\sqrt{n}}\right)^{|E_{\geq 2}(H_{\mathcal{W}_{\geq 2}(H)})\setminus E_{\geq 2}(H_{\mathcal{W}_{\geq 2}})|+\resizebox{0.025\textwidth}{!}{$\displaystyle\sum_{m=1}^k$} |E_{\geq 2}(H_{\mathcal{W}_{\geq 2}^{(m-1)}})\cap W_m|}\cdot  R_{\mathcal{W}_{\geq 2}(H)}(\mathbf{x})\\
&\quad\quad\quad\quad\quad\quad\quad\quad\quad\quad\quad\quad\quad\quad\quad\quad\times\prod_{uv\in E_{\geq2}^a(H_{\mathcal{W}_{\geq 2}(H)})}\left[\left(1+\frac{\epsilon \mathbf{x}_u\mathbf{x}_v}{2}\right)\frac{d}{n}+\frac{3d^2}{n\sqrt{n}}\right].
\end{align*}

Now observe that
\begin{align*}
|E_{\geq 2}(H_{\mathcal{W}_{\geq 2}(H)})\setminus & E_{\geq 2}(H_{\mathcal{W}_{\geq 2}})|+\displaystyle\sum_{m=1}^k |E_{\geq 2}(H_{\mathcal{W}_{\geq 2}^{(m-1)}})\cap W_m|\\
&\geq |E_{\geq 2}(H_{\mathcal{W}_{\geq 2}(H)})\setminus E_{\geq 2}(H_{\mathcal{W}_{\geq 2}^{(k-1)}})|+|E_{\geq 2}(H_{\mathcal{W}_{\geq 2}^{(k-1)}})\cap W_k|\\
&= |E_1(H_{\mathcal{W}_{\geq 2}^{(k-1)}})\cap W_k|+|E_{\geq 2}(H_{\mathcal{W}_{\geq 2}^{(k-1)}})\cap W_k|= |E(H_{\mathcal{W}_{\geq 2}^{(k-1)}})\cap W_k|\geq 1,
\end{align*}
where the last inequality follows from the fact that $W_k\in\mathcal{W}_{\geq 2}(H)$, which means that there is at least one edge in $W_k$ that already appears in $E(H_{\mathcal{W}_{\geq 2}^{(k-1)}})$.

Therefore, for $n$ large enough, we have
\begin{align*}
\resizebox{0.97\textwidth}{!}{$\displaystyle
K_{\mathcal{W}_{\geq 2}}(\mathbf{x})\leq  2\cdot \left(\frac{9 d}{2\sqrt{n}}\right)\cdot  R_{\mathcal{W}_{\geq 2}(H)}(\mathbf{x})\cdot \prod_{uv\in E_{\geq2}^a(H_{\mathcal{W}_{\geq 2}(H)})}\left[\left(1+\frac{\epsilon \mathbf{x}_u\mathbf{x}_v}{2}\right)\frac{d}{n}+\frac{3d^2}{n\sqrt{n}}\right]=  \frac{9 d}{\sqrt{n}}\cdot  \hat{K}_{\mathcal{W}_{\geq 2}(H)}(\mathbf{x}).$}
\end{align*}
\end{proof}

Now we are ready to prove \Cref{lem:lem_F_W_2_total_Upper_Bound}.

\begin{proof}[Proof of \Cref{lem:lem_F_W_2_total_Upper_Bound}]
We know from \Cref{lem:lem_K_W_2_Upper_Bound} that for every $\mathcal{W}_{\geq 2}\subsetneq\mathcal{W}_{\geq 2}(H)$, we have
\begin{align*}
K_{\mathcal{W}_{\geq 2}}(\mathbf{x})&\leq \frac{9d}{\sqrt{n}}\cdot  \hat{K}_{\mathcal{W}_{\geq 2}(H)}(\mathbf{x}).
\end{align*}

On the other hand, it is easy to see that
\begin{align*}
K_{\mathcal{W}_{\geq 2}(H)}(\mathbf{x})&\leq \hat{K}_{\mathcal{W}_{\geq 2}(H)}(\mathbf{x}).
\end{align*}

Now from \Cref{lem:lem_F_W_2_K_W_2} we get
\begin{equation}
\label{eq:eq_lem_F_W_2_total_Upper_Bound_eq_1}
\begin{aligned}
\sum_{\mathcal{W}_{\geq 2}\subseteq\mathcal{W}_{\geq 2}(H)}& F_{\mathcal{W}_{\geq 2}}(\mathbf{x})\\
&\leq \sum_{\mathcal{W}_{\geq 2}\subseteq\mathcal{W}_{\geq 2}(H)} K_{\mathcal{W}_{\geq 2}}(\mathbf{x})= K_{\mathcal{W}_{\geq 2}(H)}(\mathbf{x})+\sum_{\mathcal{W}_{\geq 2}\subsetneq\mathcal{W}_{\geq 2}(H)} K_{\mathcal{W}_{\geq 2}}(\mathbf{x})\\
&\leq \hat{K}_{\mathcal{W}_{\geq 2}(H)}(\mathbf{x})\cdot\left(1+\sum_{\mathcal{W}_{\geq 2}\subsetneq \mathcal{W}_{\geq 2}(H)} \frac{9d}{\sqrt{n}}\right)= \hat{K}_{\mathcal{W}_{\geq 2}(H)}(\mathbf{x})\cdot\left[1+ \frac{9d}{\sqrt{n}}\left(2^{|\mathcal{W}_{\geq 2}(H)|}-1\right)\right]\\
&\leq \hat{K}_{\mathcal{W}_{\geq 2}(H)}(\mathbf{x})\cdot\left(1+ \frac{9d}{\sqrt{n}}\cdot e^t\right)= \hat{K}_{\mathcal{W}_{\geq 2}(H)}(\mathbf{x})\cdot\left(1+ \frac{9d}{\sqrt{n}}\cdot n^K\right)\leq 2\hat{K}_{\mathcal{W}_{\geq 2}(H)}(\mathbf{x}),
\end{aligned}
\end{equation}
where the last inequality is true if $K\leq\frac{1}{100}$ and $n$ is large enough.

By noticing that $E_{\geq2}^a(H_{\mathcal{W}_{\geq 2}(H)})=E_{\geq2}^a(H)$, $E_{\geq2}^b(H_{\mathcal{W}_{\geq 2}(H)})=E_{\geq2}^b(H)$, $\mathcal{L}_{\geq2}(H_{\mathcal{W}_{\geq 2}(H)})=\mathcal{L}_{\geq2}(H)$ and that $d^{H_{\mathcal{W}_{\geq 2}(H)}}_{\geq 2}(v)=d^H_{\geq 2}(v)$ for every $v\in \mathcal{L}_{\geq2}(H_{\mathcal{W}_{\geq 2}(H)})=\mathcal{L}_{\geq2}(H)$, if we combine \cref{eq:eq_lem_F_W_2_total_Upper_Bound_eq_1} with \cref{eq:eq_K_tilde_W_2}, we get \cref{eq:eq_F_W_2_total_Upper_Bound}.
\end{proof}

\section{Tools for block self-avoiding walks}\label{sec:tools-bsaws}
\renewcommand{\bsaw}[2]{\text{BSAW}_{#1,#2}}
\renewcommand{\nbsaw}[2]{\text{NBSAW}_{#1,#2}}

\subsection{Splitting the expectation of block self-avoiding walks}\label{sec:splitting-expectations}

In this section we prove \Cref{fact:splitting-upper-bound}, \cref{fact:splitting-upper-bound-second-moment} and \Cref{fact:sbm-upperbound-removing-cycles}.

\begin{fact}[Restatement of \Cref{fact:splitting-upper-bound}]\label{fact:splitting-upper-bound-appendix}
	Let $\cB=\Set{B_1,\ldots,B_z}$ be a collection of disjoint connected graphs on at least $2$ vertices. 
	Then for any $x$, $H \in \cM_{\Set{\cF_i,\Psi_i}_{i=1}^{z}}(\cB)$ and $j \in [z]$
	\begin{align*}
		\E \overline{U}_H(\mathbf{x}) &\leq  \frac{1}{4}n^{-1/25A}\Paren{\frac{6}{\eps}}^{2\ell_j+2q_j}\E \overline{U}_{H\Paren{V,V\setminus B_j}}(\mathbf{x})\cdot \E \overline{U}_{H\Paren{V\setminus B_j}}(\mathbf{x})\cdot \E\overline{U}_{H(B_j)}(\mathbf{x})\,,\\
		\E \overline{U}_H(\mathbf{x}) &\geq  \frac{1}{4}n^{-1/25A}\E \overline{U}_{H\Paren{V,V\setminus B_i}}(\mathbf{x})\cdot \E \overline{U}_{H\Paren{V\setminus B_i}}(\mathbf{x})\cdot \E\overline{U}_{H(B_i)}(\mathbf{x})\,.
	\end{align*}
\begin{proof}
	The first inequality follows observing that there are at most $\ell_i+q_i$ edges in $E_{1}^a(H)$ incident to $H(B_i)$ and observing that the edges in the multiway cut separating $H(B_1),\ldots, H(B_z)$ have multiplicity $1$.
	The second inequality is a consequence of \cref{def:upper-bound-UH} and the fact that for any $B_i\in \cB$,
	the edges in the cut $H(V\setminus B_i, B_i)$ have multiplicity $1$.
	%For the second inequality it suffices to observe that by definition of $\bsaw{s}{t}^*$, for any $H\in \bsaw{s}{t}^*$ we have $\Card{E_{1}^a(H)}=0$.
\end{proof}
\end{fact}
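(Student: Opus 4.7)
The plan is to expand $\overline{U}_H(\mathbf{x})$ using \cref{def:upper-bound-UH} and exploit the defining feature of $\cM_{\Set{\cF_i,\Psi_i}_{i=1}^z}(\cB)$: every edge crossing between $V(B_j)$ and $V(H)\setminus V(B_j)$ lies in the cut and has multiplicity exactly one in $H$. This forces a partition of $E(H)$ into three disjoint pieces $E(H(B_j))$, $E(H(V\setminus B_j))$, and $E_{\mathrm{cut}}$, with $\Card{E_{\mathrm{cut}}}=\ell_j$ and $E_{\mathrm{cut}}\subseteq E_1(H)$. Consequently $E_{\geq 2}(H)$, $E^a_{\geq 2}(H)$, $E^b_{\geq 2}(H)$, $\cL_{\geq 2}(H)$, and each of the degrees $d^H_{\geq 2}(v)$ split as disjoint unions/sums across the two sides of the cut, while $\Card{E_1(H)}=\Card{E_1(H(B_j))}+\Card{E_1(H(V\setminus B_j))}+\ell_j$. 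All edge-based and $E_{\geq 2}$-vertex-based factors in $\overline{U}_H(\mathbf{x})$ therefore factor exactly as a product of the corresponding factors in $\overline{U}_{H(B_j)}(\mathbf{x})$, $\overline{U}_{H(V\setminus B_j)}(\mathbf{x})$, and $\overline{U}_{H(V,V\setminus B_j)}(\mathbf{x})$.

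The only term that does not split on the nose is the $1$-degree amplification $\prod_v \Paren{6/\eps}^{\max\Set{2d^H_1(v)-\tau,0}}$, because for $v\in V(B_j)$ one has $d^H_1(v)=d^{H(B_j)}_1(v)+d^{\mathrm{cut}}_1(v)$ and the $\max$ fails to distribute. For the upper bound I would apply the elementary inequality $\max\Set{2(a+b)-\tau,0}\leq \max\Set{2a-\tau,0}+2b$ vertex by vertex, yielding an overhead of $\Paren{6/\eps}^{2 d^{\mathrm{cut}}_1(v)}$ at each $v$; summing these overheads across both sides of the cut and absorbing the analogous contribution from the at most $q_j$ multiplicity-$1$ edges inside $B_j$ whose endpoints become $1$-large only after including the cut totals $\Paren{6/\eps}^{2\ell_j+2q_j}$. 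For the lower bound I would just use the monotone direction $\max\Set{2(a+b)-\tau,0}\geq \max\Set{2a-\tau,0}$, which incurs no correction. Because the cut subgraph contains only multiplicity-$1$ edges, $\overline{U}_{H(V,V\setminus B_j)}(\mathbf{x})$ is $\mathbf{x}$-deterministic (the $\mathbf{x}_u\mathbf{x}_v$ dependence only enters through $E^a_{\geq 2}$), so the remaining $\mathbf{x}$-dependent parts of $\overline{U}_{H(B_j)}$ and $\overline{U}_{H(V\setminus B_j)}$ live on disjoint coordinates of $\mathbf{x}$ and the expectation factors across the three pieces.

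The prefactor $\tfrac{1}{4}n^{-1/25A}$ is purely bookkeeping: three copies of the leading amplification $2n^{1/50A}$ appear in the product on the right, while only one copy appears in $\overline{U}_H$ on the left, giving a mismatch of exactly $(2n^{1/50A})^{-2}=\tfrac{1}{4}n^{-1/25A}$. The delicate step will be making the overhead argument precise enough to obtain the stated exponent $2\ell_j+2q_j$ rather than a loose $O(\ell_j+q_j)$; this ultimately rests on the degree-sum identity $\sum_{v\in V(B_j)}d^{\mathrm{cut}}_1(v)=\ell_j$ together with the observation (already used in the brief proof given) that at most $\ell_j+q_j$ edges in $E_1^a(H)$ are incident to $V(B_j)$.
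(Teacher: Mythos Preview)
Your proposal is correct and follows essentially the same approach as the paper: both rely on the multiplicity-$1$ property of the cut edges to split the $E_{\geq 2}$-based factors exactly, isolate the $1$-degree amplification $\prod_v (6/\epsilon)^{\max\{2d_1^H(v)-\tau,0\}}$ as the only non-factoring term, and then control its mismatch via the count of multiplicity-$1$ edges touching $V(B_j)$ (the paper phrases this as ``at most $\ell_j+q_j$ edges in $E_1^a(H)$ incident to $H(B_j)$'', which is the same observation you invoke at the end). Your explicit identification of the prefactor $\tfrac14 n^{-1/25A}=(2n^{1/50A})^{-2}$ and of why the expectation factors over the three pieces is a genuine elaboration of what the paper leaves implicit.
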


\begin{fact}[Restatement of \cref{fact:splitting-upper-bound-second-moment}]\label{fact:splitting-upper-bound-second-moment-appendix}
	Consider the settings of \cref{thm:sbm-bound-second-moment-large-t}.
	Let $u,v\in [n]$ , let $\cB=\Set{B_1,\ldots,B_z}$ be a collection of disjoint connected graphs on at least $2$ vertices and let $B^{uv}$ be a (possibly empty) graph disjoint from any graph in $\cB$.
	Then for any $x$, $H \in \cM_{\Set{\cF_i,\Psi_i}_{i=1}^{z+1}}(\cB, B^{uv})$ and $i \in [z+1]$
	\begin{align*}
		\E \overline{U}_H(\mathbf{x}) &\leq  \frac{1}{4}n^{-1/25A} \Paren{\frac{6}{\eps}}^{2\ell_i+2q_i}\E \overline{U}_{H\Paren{V,V\setminus B_i}}(\mathbf{x})\cdot \E \overline{U}_{H\Paren{V\setminus B_i}}(\mathbf{x})\cdot \E\overline{U}_{H(B_i)}(\mathbf{x})\,,\\
		\E \overline{U}_H(\mathbf{x}) &\geq \frac{1}{4}n^{-1/25A} \E \overline{U}_{H\Paren{V,V\setminus B_i}}(\mathbf{x})\cdot \E \overline{U}_{H\Paren{V\setminus B_i}}(\mathbf{x})\cdot \E\overline{U}_{H(B_i)}(\mathbf{x})\,.
	\end{align*}
	\begin{proof}
		The proof is similar to the one of \Cref{fact:splitting-upper-bound}. There are at most $\ell_i+q_i$ edges in $E_{1}^a(H)$ incident to $H(B_i)$ and observing that the edges in the multiway cut separating $H(B_1),\ldots, H(B_z)$ have multiplicity $1$.
		The second inequality follows by \cref{def:upper-bound-UH} and definition of $\cM_{\Set{\cF_i,\Psi_i}_{i=1}^{z+1}}(\cB, B^{uv})$.
	\end{proof}
\end{fact}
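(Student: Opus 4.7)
The plan is to imitate the strategy used for \cref{fact:splitting-upper-bound-appendix}, treating $B^{uv}$ as simply a $(z+1)$-th disjoint connected subgraph playing the same structural role as the $B_j$'s. Indeed, by the definition of $\cM_{s,t,u,v}(\cB,B^{uv})$, the set $V(B^{uv})$ is vertex-disjoint from every $V(B_j)$, every edge in the cut $H(V(B^{uv}),V(H)\setminus V(B^{uv}))$ has multiplicity $1$, and all edges inside $H(B^{uv})$ that are also in $B^{uv}$ have multiplicity $\geq 2$. Hence the arguments that handled $B_j$'s in \cref{fact:splitting-upper-bound-appendix} apply verbatim when the index $i$ points to $B^{uv}$.

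Concretely, I would fix $i\in[z+1]$ and partition the edges of $H$ into three disjoint classes: the edges in $H(B_i)$, the edges in the cut $H(V,V\setminus B_i)$, and the edges in $H(V\setminus B_i)$. Since by hypothesis all cut edges have multiplicity $1$ in $H$, each edge contributes to exactly one of the factors $(\epsilon d/2n)^{|E_1(\cdot)|}$, $(2d/n)^{|E_{\geq 2}^b(\cdot)|}$, or $\prod_{uv\in E_{\geq 2}^a(\cdot)}[(1+\tfrac{\epsilon x_ux_v}{2})\tfrac{d}{n}+\tfrac{3d^2}{n\sqrt{n}}]$ appearing in \cref{def:upper-bound-UH}. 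Moreover, the large-degree suppression factor $\prod_{v\in\cL_{\geq 2}(H)}n^{-(d^H_{\geq 2}(v)-\Delta)/4}$ decomposes because the multiplicity-$1$ cut edges do not contribute to $d^H_{\geq 2}(\cdot)$, so $\cL_{\geq 2}(H)$ is a disjoint union of $\cL_{\geq 2}(H(B_i))$ and $\cL_{\geq 2}(H(V\setminus B_i))$ with matching degrees. Multiplying these pieces recovers the triple product on the right-hand side of the lower bound, up to the absolute constant $1/4\cdot n^{-1/25A}$ that absorbs the mismatch in the leading $n^{1/50A}$ prefactors across the three evaluations of $\overline{U}$.

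The one place where multiplicativity fails cleanly is the vertex product $\prod_{v\in V(H)}(6/\epsilon)^{\max\{2d_1^H(v)-\tau,0\}}$ together with the set $E_1^a(H)$ of annoying multiplicity-$1$ edges, since a vertex that is $1$-small in $H(B_i)$ and in $H(V\setminus B_i)$ separately could be $1$-large in $H$ once the $\ell_i$ cut edges and the $q_i$ multiplicity-$1$ edges of $H(B_i)$ are combined, creating annoying edges that no factor on the right-hand side accounts for. To control this, the upper bound pays an extra factor $(6/\epsilon)^{2\ell_i+2q_i}$: each of the $\ell_i+q_i$ new multiplicity-$1$ edges incident to $H(B_i)$ has at most two endpoints, and each endpoint can account for at most one additional $(6/\epsilon)$ penalty, since the threshold $\tau$ differs between the factored expressions and the combined one by at most the $1$-degree contributed by these edges. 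This is the main obstacle, and handling it is exactly where the factor $(6/\epsilon)^{2\ell_i+2q_i}$ in the statement comes from.

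Finally, for the lower bound one observes the reverse inequality requires no $(6/\epsilon)^{\cdot}$ correction: the left-hand product $\overline{U}_H(\mathbf{x})$ penalizes globally $1$-large vertices, and since the factored quantities $\overline{U}_{H(B_i)}$, $\overline{U}_{H(V,V\setminus B_i)}$, $\overline{U}_{H(V\setminus B_i)}$ are each maximized over the local $1$-degree structure, their product only undercounts penalties relative to $\overline{U}_H$. The additional $n^{-1/25A}$ factor covers the worst-case ratio of the $n^{1/50A}$ prefactors in \cref{def:upper-bound-UH}. The plan is thus to write out the decomposition edge-by-edge, verify that the only non-trivial regrouping is the one governing annoying edges and $1$-large vertices near the cut, and bound that regrouping by $(6/\epsilon)^{2\ell_i+2q_i}$ as in \cref{fact:splitting-upper-bound-appendix}.
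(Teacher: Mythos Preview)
Your proposal is correct and follows essentially the same approach as the paper: decompose the edge set via the multiplicity-$1$ cut, observe that all factors in $\overline{U}_H$ split multiplicatively except the $(6/\epsilon)$-penalties attached to $1$-large vertices or annoying edges, and bound that non-multiplicative piece by $(6/\epsilon)^{2\ell_i+2q_i}$ using the count of at most $\ell_i+q_i$ multiplicity-$1$ edges incident to $H(B_i)$. The paper's own proof is a two-line reference back to \cref{fact:splitting-upper-bound-appendix} with the same key observation, so your write-up simply unpacks that reference in more detail.
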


\begin{fact}[Restatement of \Cref{fact:sbm-upperbound-removing-cycles}]\label{fact:sbm-upperbound-removing-cycles-appendix}
	Consider the settings of \Cref{thm:sbm-schatten-norm} and let $\Psi\geq 0$.
	Let $H\in \bsaw{s}{t}$ be a multigraph on at most $O(t)$ vertices and let $H^*$ be an induced sub-multigraph of $H$ satisfying:
	\begin{enumerate}
		\item the maximum $(\geq 2)$-degree in $H^*$ is $\Psi\geq 0\,,$
		\item all the edges in the cut $H(V(H), V(H)\setminus V(H^*))$ have multiplicity one in $H\,.$
	\end{enumerate}
	We denote $\ell,q\geq 0$ as the number of multiplicity-$1$ edges in $H^*$ and $H(V(H), V(H)\setminus V(H^*))$
	respectively. Let $Z$ be a set of vertices in $V(H^*)$ such that 
	$H(V(H^*)\setminus Z)$ has no multiplicity-$2$ cycles. Then
	\begin{align*}
		\E \overline{U}_H(\mathbf{x}) \leq & \frac{1}{4}n^{-1/25A}\Paren{\frac{6}{\epsilon}}^{2\ell+2q}\E \overline{U}_{H\Paren{V,V\setminus V(H^*)}}(\mathbf{x})\cdot \E \overline{U}_{H\Paren{V\setminus V(H^*)}}(\mathbf{x})
		 \\
		&\cdot \Paren{1+\frac{\eps}{2}}^{\Card{Z}\cdot \Psi}
		\cdot \Paren{\frac{2d}{n}}^{\Card{E^b_{\geq 2}(H^*)}}\cdot \underset{v\in V(H^*)}{\prod}\left(\frac{6}{\epsilon}\right)^{\max\Set{2d_1^H(v)-\tau,0}}\\
		&\cdot\frac{1}{\resizebox{0.055\textwidth}{!}{$\displaystyle\prod_{v\in\mathcal{L}_{\geq 2}(H^*)}$} n^{\frac{1}{4}\left(d^{H^*}_{\geq 2}(v)-\Delta\right)}}
		\left(\frac{\epsilon d}{2n}\right)^{|E_1(H^*)|}\prod_{e\in E_{\geq 2}^a(H^*)}\left[\frac{d}{n}+\frac{3d^2}{n\sqrt{n}} \right]\,,
	\end{align*}
	and
	\begin{align*}
		\E \overline{U}_H(\mathbf{x}) \geq & \frac{1}{4}n^{-1/25A}\E \overline{U}_{H\Paren{V,V\setminus V(H^*)}}(\mathbf{x})\cdot \E \overline{U}_{H\Paren{V\setminus V(H^*)}}(\mathbf{x}) \\
		&\cdot \Paren{\frac{2d}{n}}^{\Card{E^b_{\geq 2}(H^*)}}\cdot \underset{v\in V(H^*)}{\prod}\left(\frac{6}{\epsilon}\right)^{\max\Set{2d_1^H(v)-\tau,0}}\\
		&\cdot\frac{1}{\resizebox{0.055\textwidth}{!}{$\displaystyle\prod_{v\in\mathcal{L}_{\geq 2}(H^*)}$} 
		n^{\frac{1}{4}\left(d^{H^*}_{\geq 2}(v)-\Delta\right)}}\left(\frac{\epsilon d}{2n}\right)^{|E_1(H^*)|}\prod_{e\in E_{\geq 2}^a(H^*)}\left[\frac{d}{n}+\frac{3d^2}{n\sqrt{n}} \right]\,,
	\end{align*}
	\begin{proof}
		Consider the first inequality. We denote multigraph $H'$ as the multigraph obtained
		by removing all edges incident to vertice in $Z$. Then there is no multiplicity-$2$ cycles in $H$. 
		By the \cref{fact:splitting-upper-bound-second-moment-appendix}(or \cref{fact:splitting-upper-bound-appendix}), we have 
		\begin{align*}
			\E \overline{U}_H(\mathbf{x}) \leq \frac{1}{4}n^{-1/25A} \Paren{\frac{6}{\eps}}^{2\ell+2q}
			\E \overline{U}_{H\Paren{V,V\setminus V(H^*)}}(\mathbf{x})\cdot \E \overline{U}_{H\Paren{V\setminus V(H^*)}}(\mathbf{x})\cdot 
			\E\overline{U}_{H*}(\mathbf{x})
		\end{align*}
		It remains to bound $\E\overline{U}_{H*}(\mathbf{x})$. We note that
		\begin{align*}
			\E\overline{U}_{H*}(\mathbf{x})=&
			\Paren{\frac{2d}{n}}^{\Card{E^b_{\geq 2}(H^*)}}\cdot \underset{v\in V(H^*)}{\prod}\left(\frac{6}{\epsilon}\right)^{\max\Set{2d_1^H(v)-\tau,0}}\\
			&\cdot\frac{1}{\resizebox{0.055\textwidth}{!}{$\displaystyle\prod_{v\in\mathcal{L}_{\geq 2}(H^*)}$} n^{\frac{1}{4}\left(d^{H^*}_{\geq 2}(v)-\Delta\right)}}
			\left(\frac{\epsilon d}{2n}\right)^{|E_1(H^*)|}\\
			&\E\prod_{e\in E_{\geq 2}^a(H^*)}\left[\frac{d}{n}(1+\frac{\epsilon}{2} x_ix_j)+\frac{3d^2}{n\sqrt{n}} \right]\\
			\leq & \Paren{\frac{2d}{n}}^{\Card{E^b_{\geq 2}(H^*)}}\cdot \underset{v\in V(H^*)}{\prod}\left(\frac{6}{\epsilon}\right)^{\max\Set{2d_1^H(v)-\tau,0}}\\
			&\cdot\frac{1}{\resizebox{0.055\textwidth}{!}{$\displaystyle\prod_{v\in\mathcal{L}_{\geq 2}(H^*)}$} n^{\frac{1}{4}\left(d^{H^*}_{\geq 2}(v)-\Delta\right)}}
			\left(\frac{\epsilon d}{2n}\right)^{|E_1(H^*)|}\\
			& (1+\frac{\epsilon}{2})^{\Card{Z}\cdot \Psi}\E\prod_{e\in E_{\geq 2}^a(H^*)\cap E(H')}\left[\frac{d}{n}(1+\frac{\epsilon}{2} x_ix_j)+\frac{3d^2}{n\sqrt{n}} \right]\\
			= & \Paren{\frac{2d}{n}}^{\Card{E^b_{\geq 2}(H^*)}}\cdot \underset{v\in V(H^*)}{\prod}\left(\frac{6}{\epsilon}\right)^{\max\Set{2d_1^H(v)-\tau,0}}\\
			&\cdot\frac{1}{\resizebox{0.055\textwidth}{!}{$\displaystyle\prod_{v\in\mathcal{L}_{\geq 2}(H^*)}$} n^{\frac{1}{4}\left(d^{H^*}_{\geq 2}(v)-\Delta\right)}}
			\left(\frac{\epsilon d}{2n}\right)^{|E_1(H^*)|}\\
			&\cdot (1+\frac{\epsilon}{2})^{\Card{Z}\cdot \Psi} \prod_{e\in E_{\geq 2}^a(H^*)\cap E(H')}
			\Paren{\frac{d}{n}+\frac{3d^2}{n\sqrt{n}}}\\
			\leq & \Paren{\frac{2d}{n}}^{\Card{E^b_{\geq 2}(H^*)}}\cdot \underset{v\in V(H^*)}{\prod}\left(\frac{6}{\epsilon}\right)^{\max\Set{2d_1^H(v)-\tau,0}}\\
			&\cdot\frac{1}{\resizebox{0.055\textwidth}{!}{$\displaystyle\prod_{v\in\mathcal{L}_{\geq 2}(H^*)}$} n^{\frac{1}{4}\left(d^{H^*}_{\geq 2}(v)-\Delta\right)}}
			\left(\frac{\epsilon d}{2n}\right)^{|E_1(H^*)|}\\
			&\cdot (1+\frac{\epsilon}{2})^{\Card{Z}\cdot \Psi} \prod_{e\in E_{\geq 2}^a(H^*)}
			\Paren{\frac{d}{n}+\frac{3d^2}{n\sqrt{n}}}
		\end{align*}
		The first inequality follows from by observing that observing that $\Card{E(H^*)\setminus E(H')}\leq \Card{Z}\cdot \Psi$.
		The second inequality follows since $H'$ does not contain any multiplicity-$2$ 
		cycle. The first bound thus follows. 

		For the second inequality, we still use \cref{fact:splitting-upper-bound-second-moment-appendix}. 
		\begin{align*}
				\E \overline{U}_H(\mathbf{x}) \leq  \frac{1}{4}n^{-1/25A}
				\E \overline{U}_{H\Paren{V,V\setminus V(H^*)}}(\mathbf{x})\cdot \E \overline{U}_{H\Paren{V\setminus V(H^*)}}(\mathbf{x})\cdot 
				\E\overline{U}_{H*}(\mathbf{x})
		\end{align*}
		We note that for any $H^*$
		\begin{align*}
			\E\prod_{e\in E_{\geq 2}^a(H^*)}
			\Paren{\frac{d}{n}(1+\frac{\epsilon}{2}x_ix_j)+\frac{3d^2}{n\sqrt{n}}}
			\geq \prod_{e\in E_{\geq 2}^a(H^*)}
			\Paren{\frac{d}{n}+\frac{3d^2}{n\sqrt{n}}}
		\end{align*}
		Therefore we have
		\begin{align*}
			\E\overline{U}_{H*}(\mathbf{x})=&
			\Paren{\frac{2d}{n}}^{\Card{E^b_{\geq 2}(H^*)}}\cdot \underset{v\in V(H^*)}{\prod}\left(\frac{6}{\epsilon}\right)^{\max\Set{2d_1^H(v)-\tau,0}}\\
			&\cdot\frac{1}{\resizebox{0.055\textwidth}{!}{$\displaystyle\prod_{v\in\mathcal{L}_{\geq 2}(H^*)}$} n^{\frac{1}{4}\left(d^{H^*}_{\geq 2}(v)-\Delta\right)}}
			\left(\frac{\epsilon d}{2n}\right)^{|E_1(H^*)|}\\
			& \E\prod_{e\in E_{\geq 2}^a(H^*)}\left[\frac{d}{n}(1+\frac{\epsilon}{2} x_ix_j)+\frac{3d^2}{n\sqrt{n}} \right]\\
			\geq & \Paren{\frac{2d}{n}}^{\Card{E^b_{\geq 2}(H^*)}}\cdot \underset{v\in V(H^*)}{\prod}\left(\frac{6}{\epsilon}\right)^{\max\Set{2d_1^H(v)-\tau,0}}\\
			&\cdot\frac{1}{\resizebox{0.055\textwidth}{!}{$\displaystyle\prod_{v\in\mathcal{L}_{\geq 2}(H^*)}$} n^{\frac{1}{4}\left(d^{H^*}_{\geq 2}(v)-\Delta\right)}}
			\left(\frac{\epsilon d}{2n}\right)^{|E_1(H^*)|}	\\
			& \prod_{e\in E_{\geq 2}^a(H^*)}	\left[\frac{d}{n}+\frac{3d^2}{n\sqrt{n}} \right]
		\end{align*}
		The claim thus follows.
	\end{proof}
\end{fact}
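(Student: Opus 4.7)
The strategy is to reduce the statement to the splitting lemma (\cref{fact:splitting-upper-bound-second-moment-appendix}) combined with a careful analysis of $\E\overline{U}_{H^*}(\mathbf{x})$ alone. Since $H^*$ is an induced sub-multigraph whose boundary with $V(H)\setminus V(H^*)$ consists entirely of multiplicity-$1$ edges, the splitting lemma immediately gives a two-sided bound of the form
\[
\E\overline{U}_H(\mathbf{x}) \mathrel{\lesssim} \tfrac{1}{4}n^{-1/25A}\Paren{\tfrac{6}{\epsilon}}^{2\ell+2q}\cdot \E\overline{U}_{H(V,V\setminus V(H^*))}(\mathbf{x})\cdot \E\overline{U}_{H(V\setminus V(H^*))}(\mathbf{x})\cdot \E\overline{U}_{H^*}(\mathbf{x}),
\]
together with the matching lower bound without the $(6/\epsilon)^{2\ell+2q}$ factor. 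After this, the task becomes purely an estimate on the single factor $\E\overline{U}_{H^*}(\mathbf{x})$ in terms of the explicit product over $E_{\geq 2}^a(H^*)$, $E_{\geq 2}^b(H^*)$, and $E_1(H^*)$ that appears in the claimed bounds.

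Unpacking the definition of $\overline{U}_{H^*}(\mathbf{x})$ from \cref{def:upper-bound-UH}, the only factor that still depends on $\mathbf{x}$ is the product
\[
P(\mathbf{x}) \;=\; \prod_{uv\in E_{\geq 2}^a(H^*)}\Brac{\Paren{1+\tfrac{\epsilon \mathbf{x}_u\mathbf{x}_v}{2}}\tfrac{d}{n}+\tfrac{3d^2}{n\sqrt{n}}}.
\]
All the other terms ($(2d/n)^{|E_{\geq 2}^b(H^*)|}$, the $n^{-\frac{1}{4}(d_{\geq 2}^{H^*}(v)-\Delta)}$ penalties, $(\epsilon d/2n)^{|E_1(H^*)|}$, and the $(6/\epsilon)^{\max\{2d_1^H(v)-\tau,0\}}$ annoying-edge factors) are deterministic and match exactly the factors appearing in the conclusion. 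So the whole problem collapses to controlling $\E P(\mathbf{x})$ from above and below by the purely deterministic product $\prod_{uv\in E_{\geq 2}^a(H^*)}\bigl(\tfrac{d}{n}+\tfrac{3d^2}{n\sqrt{n}}\bigr)$.

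The lower bound on $\E P(\mathbf{x})$ is the easy direction: expanding the product as a sum over subsets $\mathsf{E}\subseteq E_{\geq 2}^a(H^*)$, each monomial is a multiple of $\E\bigl[\prod_{uv\in \mathsf{E}}\mathbf{x}_u\mathbf{x}_v\bigr]$, which by \cref{lem:lem_Expectation_Y_Nice} equals $1$ when $\mathsf{E}$ is an edge-disjoint union of cycles and $0$ otherwise; since every such contribution is nonnegative, dropping all of them except the "empty $\mathsf{E}$" term yields exactly the claimed lower bound $\prod_e(d/n+3d^2/(n\sqrt{n}))$. This immediately yields the stated lower bound on $\E\overline{U}_H(\mathbf{x})$.

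The upper bound is the main obstacle, because we cannot simply drop the $\mathbf{x}$-dependent terms — when $E_{\geq 2}(H^*)$ contains a multiplicity-$2$ cycle, the corresponding monomial in the expansion has expectation $1$ (not $0$) and contributes with a nontrivial factor of $(\epsilon/2)^{\text{cycle length}}$. Here is where the auxiliary set $Z$ enters: removing the (at most $|Z|\cdot\Psi$) edges of $E_{\geq 2}^a(H^*)$ incident to $Z$ from $P(\mathbf{x})$ leaves a product over $E_{\geq 2}^a(H^*)\cap E(H')$ whose underlying multi-edge structure contains no multiplicity-$2$ cycles; by \cref{lem:lem_Expectation_Y_Nice} again, the expectation of this truncated product is exactly $\prod_{e\in E_{\geq 2}^a(H^*)\cap E(H')}(d/n+3d^2/(n\sqrt{n}))\le\prod_{e\in E_{\geq 2}^a(H^*)}(d/n+3d^2/(n\sqrt{n}))$. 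The cost of reinstating the removed edges is a multiplicative factor of at most $(1+\epsilon/2)^{|Z|\cdot\Psi}$, since each removed factor is bounded pointwise by $(1+\epsilon/2)\cdot[\tfrac{d}{n}+\tfrac{3d^2}{n\sqrt{n}}]$ relative to its "$\epsilon$-free" value. Combining this with the splitting step yields the upper bound exactly as stated, with the $(1+\epsilon/2)^{|Z|\cdot\Psi}$ penalty that the theorem anticipates. The delicate point is simply to verify that $|Z|\cdot\Psi$ is the correct exponent: it comes from the fact that each $z\in Z$ contributes at most $\Psi$ removed multiplicity-$\ge 2$ edges by the first hypothesis on $H^*$.
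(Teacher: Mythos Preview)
Your approach is essentially identical to the paper's: both invoke the splitting lemma (\cref{fact:splitting-upper-bound-appendix}/\cref{fact:splitting-upper-bound-second-moment-appendix}) to isolate $\E\overline{U}_{H^*}(\mathbf{x})$, then reduce everything to the single $\mathbf{x}$-dependent product $P(\mathbf{x})$; for the lower bound both expand and keep only the empty monomial, and for the upper bound both bound the factors on edges incident to $Z$ pointwise by $(1+\epsilon/2)\bigl[\tfrac{d}{n}+\tfrac{3d^2}{n\sqrt n}\bigr]$ so that the remaining product ranges over a forest and its expectation collapses to the product of the $\epsilon$-free terms.

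One cosmetic wrinkle (which is present verbatim in the paper too): the inequality
\[
\prod_{e\in E_{\geq 2}^a(H^*)\cap E(H')}\Bigl(\tfrac{d}{n}+\tfrac{3d^2}{n\sqrt n}\Bigr)\;\le\;\prod_{e\in E_{\geq 2}^a(H^*)}\Bigl(\tfrac{d}{n}+\tfrac{3d^2}{n\sqrt n}\Bigr)
\]
goes the wrong way, since each factor is $<1$. What your ``reinstating'' sentence is actually arguing---and what makes the bound correct---is to apply the pointwise bound on the removed factors \emph{before} taking the expectation, so that the deterministic piece $\prod_{\text{removed}}[\tfrac{d}{n}+\tfrac{3d^2}{n\sqrt n}]$ carries through and recombines with $\prod_{\text{kept}}[\cdot]$ to form the full product. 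With that reordering the argument is complete and matches the paper exactly.
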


\subsection{Counting block self-avoiding walks}\label{sec:counting-bsaw}

\subsubsection{Bounding number of BSAW given a subgraph}
In this section we provide the counting arguments needed in \Cref{sec:bounds-moments-Q}. We reuse the notation of such section and we assume the premises of \Cref{thm:sbm-schatten-norm} and \Cref{thm:sbm-bound-second-moment-large-t} to hold.

\begin{lemma}\label{lem:tree_high_degree_sum}
	Let $B_i$ be a connected subgraph of the underlying graph of a block self-avoiding walk $H\in M_{s,t}$ with $m_i$ vertices. Let $H(B_i)$ be the multigraph on $V(B_i)$ that is induced by $H(B_i)$. Suppose that:
\begin{itemize}
	\item The cut $H(V(B_i),V(H)\setminus V(B_i))$ consists of $\ell_i$ edges of multiplicity 1.
	\item All edges in $H(B_i)$ of multiplicity $\geq 2$ are in $B_i$.
	\item The number of edges of multiplicity 1 in $H(B_i)$ is $q_i$.
	\item The number of edges of multiplicity 2 in $H(B_i)$ is $h_i$.
	\item The edges of multiplicity larger than 2 satisfy
	$$\sum_{\substack{e\in H(B_i)\\ m_H(e)\geq 3}} m_H(e)= p_i.$$
\end{itemize}
	 Then we have
	 \begin{equation*}
		 \sum_{\substack{v\in V(B_i)\\ d^H(v)\geq 3 }} \left(\frac{m_H(v)}{2}-1\right)\leq (8h_i+4p_i)/s+11q_i+8\ell_i+4p_i+6(h_i - m_i +1)\,,
	 \end{equation*}
	 where $d^H(v)$ is the degree of $v$ in the underlying graph $G(H)$, i.e., without counting multiplicities, and
	 $$m_H(v)=\sum_{\substack{e\in E(H),\\e\text{ is incident to }v}}m_H(e).$$
	 \begin{proof}
		
Let $B^{\ast}$ be a spanning tree of $B_i$. We start by deriving an upper bound on the number of leaves of $B^{\ast}$. Notice that if a vertex $v\in V(B^{\ast})$ is a leaf of $B^{\ast}$, then it must satisfy at least one of the following three conditions:
\begin{itemize}
\item[(a)] $v$ is incident to an edge in the cut $H(V(B_i),V(H)\setminus V(B_i))$.
\item[(b)] $v$ is incident to an edge in $G(H(B_i))\setminus B^{\ast}$.
\item[(c)] $v$ is a pivot of $H$, i.e., $v$ is an end-vertex of one of the $s$ self-avoiding walks forming $H$.
\end{itemize}

We will now upper-bound the number of leaves of each kind:
\begin{itemize}
\item Since each edge in $H(V(B_i),V(H)\setminus V(B_i))$ is incident to exactly one vertex in $V(B_i)$, there are at most $\ell_i$ leaves of $B^{\ast}$ satisfying Condition (a).
\item Since there are at most $\left(q_i+h_i+\frac{p_i}{3}\right)-(m_i-1)$ edges in $G(H(B_i))\setminus B^{\ast}$, and since each edge is incident to exactly two vertices, there are at most $2q_i+2h_i+\frac{2}{3}p_i-2(m_i-1)$ leaves of $B^{\ast}$ satisfying Condition (b).
\item Each pivot vertex must be an end-vertex of a self-avoiding walk of $H$. There are at most $(q_i+2h_i+p_i)/s$ self-avoiding walks of $H$ that lie entirely in $H(B_i)$, and there are at most $\ell_i$ self-avoiding walks of $H$ that intersect $H(B_i)$ without being entirely in $H(B_i)$. Hence, $H(B_i)$ intersects at most $(q_i+2h_i+p_i)/s+\ell_i$ self-avoiding walks of $H$. Now since each self-avoiding walk contains exactly 2 end-vertices, we deduce that the number of pivots of $H$ in $V(B_i)$ is at most
\begin{equation}
\label{eq:tree_high_degree_sum_upperboud_num_pivots}
2(q_i+2h_i+p_i)/s+2\ell_i.
\end{equation}
We now conclude that there are at most $2(q_i+2h_i+p_i)/s+2\ell_i$ leaves of $B^{\ast}$ satisfying Condition (c).
\end{itemize}

Therefore, the number of leaves of $B^{\ast}$ is at most
\begin{align*}
\ell_i+2q_i+2h_i+\frac{2}{3}p_i-2(m_i-1) + &2(q_i+2h_i+p_i)/s+2\ell_i\\
&\leq (4h_i+2p_i)/s +3\ell_i + 4q_i + p_i + 2(h_i - m_i +1)
\end{align*}

Let $d^{B^{\ast}}(v)$ be the degree of a vertex $v\in B^{\ast}$. It is easy to see that the number of leaves in $B^\ast$ is equal to
$$2+\sum_{\substack{v\in V(B_i)\\ d^{B^{\ast}}(v)\geq 3}} (d^{B^{\ast}}(v)-2).$$
Therefore,
		 \begin{equation}
		 \label{eq:tree_high_degree_sum_upperboud_num_leaves}
			 \sum_{\substack{v\in V(B_i)\\ d^{B^{\ast}}(v)\geq 3}} (d^{B^{\ast}}(v)-2)\leq  (4h_i+2p_i)/s +3\ell_i + 4q_i + p_i + 2(h_i - m_i +1).
		\end{equation}
	 
	Now define
	$$m_{H,2}(v)=\sum_{\substack{e\in E(H),\\e\text{ is incident to }v,\\m_H(e)=2}}m_H(e)\,,$$	 
	and
	$$m_{H,\neq 2}(v)=\sum_{\substack{e\in E(H),\\e\text{ is incident to }v,\\m_H(e)\neq 2}}m_H(e)\,.$$
	
	Clearly, $m_H(v)=m_{H,2}(v)+m_{H,\neq 2}(v)$. Therefore,
	 \begin{align*}
		 \sum_{\substack{v\in V(B_i)\\ d^H(v)\geq 3 }} \left(\frac{m_H(v)}{2}-1\right)  & = \sum_{\substack{v\in V(B_i)\\ d^H(v)\geq 3 }} \left(\frac{m_{H,2}(v)}{2}-1\right) + \sum_{\substack{v\in V(B_i)\\ d^H(v)\geq 3 }} \frac{m_{H,\neq 2}(v)}{2} \\
		 &\stackrel{(a)}{\leq} \sum_{\substack{v\in V(B_i)\\ d^H(v)\geq 3 }} \left(d^{B_i}(v)-1\right) + \sum_{v\in V(B_i)} \frac{m_{H,\neq 2}(v)}{2}\\
		 &= \sum_{\substack{v\in V(B_i)\\ d^H(v)\geq 3 }} \left(d^{B^{\ast}}(v)-1\right) + \sum_{\substack{v\in V(B_i)\\ d^H(v)\geq 3 }} \left(d^{B_i}(v)-d^{B^{\ast}}(v)\right) + \frac{\ell_i}{2} + \sum_{\substack{e\in H(B_i)\\ m_H(e)\neq 2}} m_H(e)\\
	     &= \sum_{\substack{v\in V(B_i)\\ d^H(v)\geq 3 }} \left(d^{B^{\ast}}(v)-1\right) + 2\big(|E(B_i)|-|E(B^{\ast})|\big) + \frac{\ell_i}{2} +q_i+p_i\\
		 &\leq \sum_{\substack{v\in V(B_i)\\ d^H(v)\geq 3}}(d^{B^{\ast}}(v)-1)+ 2\left(h_i+\frac{p_i}{3}-(m_i-1)\right) + \frac{\ell_i}{2}+q_i+p_i
		 \\ & \leq \sum_{\substack{v\in V(B_i)\\ d^{B^{\ast}}(v)\geq 3}}(d^{B^{\ast}}(v)-1)+\Bigg(\sum_{\substack{v\in V(B_i)\\ d^H(v)\geq 3\\ d^{B^{\ast}}(v)=2}}1\Bigg)+ 2(h_i-m_i+1) + \ell_i +q_i+2p_i
		 \\ & \stackrel{(b)}{\leq} 2\cdot\sum_{\substack{v\in V(B_i)\\ d^{B^{\ast}}(v)\geq 3}}(d^{B^{\ast}}(v)-2)+(2q_i+\ell_i)+ 2(h_i-m_i+1) + \ell_i +q_i+2p_i
		 \\ & \stackrel{(c)}{\leq} (8h_i+4p_i)/s +6\ell_i + 8q_i + 2p_i + 4(h_i - m_i +1)\\
		 &\quad\quad\quad\quad\quad\quad\quad\quad\quad\quad\quad\quad+ 2(h_i-m_i+1) + 2\ell_i +3q_i+2p_i
		 \\ &=(8h_i+4p_i)/s+11q_i+8\ell_i+4p_i+6(h_i - m_i +1)\,,
	 \end{align*}
	 where (a) follows from the fact that every edge of multiplicity 2 and incident to $v\in V(B_i)$ must be in $B_i$, hence $m_{H,2}(v)\leq 2 d^{B_i}(v)$. (b) follows from the fact that if $v\in V(B_i)$ satisfies $d^H(v)\geq 3$ and $d^{B_i}(v)=2$, then $v$ must be incident to an edge of multiplicity 1, i.e., $v$ must be incident to an edge in $E(H(B_i))\setminus E(B_i)$, or an edge in the cut $H(V(B_i),V(H)\setminus V(B_i))$. There are $q_i$ edges in $E(H(B_i))\setminus E(B_i)$, each of which is incident to two vertices in $V(B_i)$, and there are $\ell_i$ edges  in the cut $H(V(B_i),V(H)\setminus V(B_i))$, each of which is incident to one vertex in $V(B_i)$. (c) follows from \eqref{eq:tree_high_degree_sum_upperboud_num_leaves}.
	 \end{proof}
 \end{lemma}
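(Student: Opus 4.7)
The plan is to convert the multiplicity-weighted degree sum on the left into a statement about the leaves of a spanning tree of $B_i$, then bound the number of such leaves. First I would split $m_H(v) = m_{H,2}(v) + m_{H,\neq 2}(v)$ into contributions from multiplicity-$2$ edges and contributions from multiplicity-$1$ or multiplicity-$\geq 3$ edges. Since every multiplicity-$\geq 2$ edge in $H(B_i)$ lies in $B_i$ by hypothesis, we have $m_{H,2}(v) \leq 2\, d^{B_i}(v)$, so the part of the sum coming from $m_{H,2}$ is bounded by $\sum_{v \in V(B_i):\, d^H(v)\geq 3}(d^{B_i}(v)-1)$. The remaining part, $\tfrac12\sum_{v \in V(B_i)} m_{H,\neq 2}(v)$, is easy to evaluate directly using $\sum_v m_{H,\neq 2}(v) = 2q_i + 2p_i + \ell_i$ (each interior non-multiplicity-$2$ edge contributes twice to the degree sum, each cut edge once).

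Next I would fix an arbitrary spanning tree $B^\ast$ of $B_i$ and pay the cost of swapping $d^{B_i}(v)$ for $d^{B^\ast}(v)$: the discrepancy $\sum_v (d^{B_i}(v)-d^{B^\ast}(v))$ equals $2(|E(B_i)|-(m_i-1))\leq 2(h_i + p_i/3 -m_i+1)$, since distinct edges in $B_i$ are either multiplicity-$2$ edges (of which there are $h_i$) or distinct multiplicity-$\geq 3$ edges (of which there are at most $p_i/3$). This reduces matters to controlling $\sum_{v:\, d^{B^\ast}(v)\geq 3}(d^{B^\ast}(v)-2)$, which by the standard tree identity $\sum_v d^{B^\ast}(v)=2(m_i-1)$ equals exactly $L(B^\ast)-2$, where $L(B^\ast)$ denotes the number of leaves of $B^\ast$. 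One subtlety requiring separate accounting is that a vertex $v$ with $d^H(v)\geq 3$ could satisfy $d^{B^\ast}(v)=2$; any such vertex must be incident to a multiplicity-$1$ edge in $H(B_i)\setminus B_i$ or to a cut edge, giving at most $2q_i+\ell_i$ such vertices.

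The main obstacle is bounding $L(B^\ast)$. The key observation is that a leaf $v$ of $B^\ast$ must admit one of three justifications: (a) $v$ is incident to a cut edge of $H(V(B_i),V(H)\setminus V(B_i))$, contributing at most $\ell_i$ leaves; (b) $v$ is incident to an edge in $E(G(H(B_i)))\setminus E(B^\ast)$, contributing at most $2(q_i+h_i+p_i/3-(m_i-1))$ leaves (two endpoints per non-tree edge); or (c) $v$ is a pivot of $H$, i.e., an endpoint of one of the self-avoiding walks composing $H$. The count in (c) is where the $1/s$ factor appears: $H(B_i)$ can meet at most $(q_i+2h_i+p_i)/s+\ell_i$ of the generating length-$s$ walks (walks entirely inside contribute $q_i+2h_i+p_i$ edges total, walks only crossing the cut contribute at most $\ell_i$), each furnishing at most two pivots.

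Assembling these three bounds gives $L(B^\ast)\leq (4h_i+2p_i)/s+3\ell_i+4q_i+p_i+2(h_i-m_i+1)$, and then propagating this through the earlier reductions (doubling from the $B^\ast$-vs-$B_i$ swap, adding the $2q_i+\ell_i$ correction for vertices with $d^{B^\ast}(v)=2 < d^H(v)$, and adding the $\tfrac12\sum_v m_{H,\neq 2}(v)$ term) produces the target inequality. The hard part will be the coefficient bookkeeping rather than any conceptual difficulty: the five constants $8/s,11,8,4,6$ in the target all arise cleanly from the three counting steps above, but one must be careful not to double-count multiplicity-$\geq 3$ edges as contributing both to $p_i$ and to additional non-tree edges in $B_i$.
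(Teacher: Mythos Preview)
Your proposal is correct and matches the paper's proof essentially step for step: the same spanning-tree reduction, the same $m_{H,2}/m_{H,\neq 2}$ split with $m_{H,2}(v)\le 2d^{B_i}(v)$, the same three-case leaf bound for $B^\ast$ (cut edges, non-tree edges, pivots via the $(q_i+2h_i+p_i)/s+\ell_i$ walk count), and the same $2q_i+\ell_i$ correction for vertices with $d^{B^\ast}(v)=2<d^H(v)$. The only difference is expository order---you do the algebraic reduction first and the leaf bound second, whereas the paper reverses this---but the two arguments are otherwise identical down to the intermediate constants.
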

 
 \begin{lemma}\label{lem:sbm-encoding-of-H-B-equivalent}
 Let $\cM_{s,t,\Set{\mathcal{F}_i,\Psi_i}_{i=1}^{z}}(\cB)$ be as defined in \Cref{def:generic-bsaw}. We denote $\ell=\sum_{i=1}^z \ell_i$, and define $q,p,h$ in the same way.  
	 Then, the number of block self-avoiding walks in the set $\cM_{s,t,\Set{\mathcal{F}_i,\Psi_i}_{i=1}^{z}}(\cB)$ is at most $$2n^{st-2h-p-q-\frac{\ell}{2}+m}\cdot m^{2p+2q+\ell}\cdot  (st)^{3\frac{\ell}{2}+q+2}\cdot\prod_{i=1}^z \Psi_i^{(8h_i+4p_i)/s+11q_i+8\ell_i+4p_i+6(h_i - m_i +1)}\,.$$
	 %Moreover, if $r-m-1=p$, then $pt$ additional bits are enough. 
	 
	 \begin{proof}
We would like to derive an upper bound on the number of block self-avoiding walks in $\cM_{s,t,\Set{\mathcal{F}_i,\Psi_i}_{i=1}^{z}}(\cB)$. We will do this by analyzing how we can construct a block self-avoiding walk $H\in \cM_{s,t,\Set{\mathcal{F}_i,\Psi_i}_{i=1}^{z}}(\cB)$. In order to construct $H$, we will need to make some choices, and by counting the number of possibilities of each choice, we can derive an upper-bound on the size of $\cM_{s,t,\Set{\mathcal{F}_i,\Psi_i}_{i=1}^{z}}(\cB)$.

We first choose the vertices of the respective copies $B_1^{\prime},B_2^{\prime},\ldots,B_z^{\prime}$ of $B_1,B_2,\ldots,B_z$ inside $H$. There are at most $n^m$ possibilities for choosing these vertices.
	 
		We denote the union of the $z$ disjoint graphs $B_1^{\prime},B_2^{\prime},\ldots,B_z^{\prime}$ as
		$$B^{\prime}=\bigcup_{i=1}^z B_i^{\prime}.$$ 
		The size of multiway cut $H\Paren{V(B_1^{\prime}),\ldots,V(B_z^{\prime}), V(H)\setminus \Paren{\underset{j \in [z]}{\bigcup}V(B_z^{\prime})}}$ is bounded by $\ell$, and all the
		edges in the cuts have multiplicity $1$. Let $H(B')$ be the multi-graph that is induced by $H$ on $V(B')$. There are $q$ multiplicity-$1$ edges and $h$ multiplicity-$2$ edges in $H(B')$. The edges of multiplicity at least 3 in $H(B')$ satisfy the following equation:
			\begin{equation*}
				\sum_{\substack{e\in H(B^{\prime})\\ m_H(e)\geq 3}} m_H(e)= p.
		\end{equation*}

		 Next, we choose the multi-graph $H(B')$ that is induced by $H$ on $V(B')$, i.e., we have to choose multiplicities for the edges of $B'$ (which must be at least 2), and we have to choose $q$ edges of multiplicity 1 whose both end-vertices must lie in $V(B')$. Note that there are $p$ multi-edges in $H(B')$ that correspond to edges of multiplicity at least 3, so specifying these multi-edges is equivalent to specifying the multiplicities of the edges of $B'$, because once we specify the edges of $B'$ of multiplicity at least 3, any remaining edge in $B'$ must have multiplicity-$2$. Since both end-vertices of these multi-edges lie in $V(B')$ and since $|V(B')|=m$, there are at most $m^{2p}$ ways to specify the $p$ multi-edges. For choosing the $q$ multiplicity-$1$ edges, there are at most $m^{2q}$ ways of doing so.
		
		The remaining of the multi-graph $H$ consists of edges of multiplicity-1. More precisely, since $H$ is a block self-avoiding walk, the remaining edges of $H$ can be partitioned into a number of disjoint walks of multiplicity 1: Each time we exit $V(B')$, we exit through one of the cut edges, we go through a walk of multiplicity 1, and then re-enter $V(B')$ through another cut edge. We call these walks as \emph{outside walks}. We call the first edge of each outside walk as an \emph{outgoing cut-edge}, and we call the last edge of each outside walk as a \emph{returning cut-edge}. The remaining edges of the outside walks are called \emph{outside edges}.
		
		Note that there might be some cut-edges that are outgoing and returning at the same time. This can happen if a cut-edge is incident to two disjoint connected components $B_i$ and $B_j$ with $i\neq j$. We call such cut-edges as \emph{bridge cut-edges}. Let $b$ be the number of bridge cut-edges. It is easy to see the following:
\begin{itemize}
\item The total number of cut-edges is equal to $\ell-b$.
\item The total number of non-bridge cut-edges is $\ell-2b$.
\item The total number of non-bridge outgoing cut-edges is $\frac{\ell-2b}{2}=\frac{\ell}{2}-b$. Similarly, the total number of non-bridge returning cut-edges is $\frac{\ell}{2}-b$.
\item There are $b$ outside walks of length 1. These correspond to bridge cut-edges.
\item There are $\frac{\ell-2b}{2}=\frac{\ell}{2}-b$ outside walks of length at least 2. We call these outside walks as \emph{non-bridge outside walks}.
\end{itemize}		
As we can see, the total number of outside walks is $\frac{\ell}{2}$.
		
		Since both end-vertices of each bridge cut-edge must lie inside $V(B')$, there are at most $m^{2b}$ ways of choosing them.
		
		Now since one end-vertex of each non-bridge outgoing cut-edge must lie in $V(B')$ while the other must like outside $V(B')$, there are at most $nm$ ways to choose each non-bridge outgoing cut-edge. Since there are $\frac{\ell}{2}-b$ non-bridge outgoing cut-edges, we have at most $(nm)^{\frac{\ell}{2}-b}$ ways to choose them. Now for each non-bridge outgoing cut-edge, we specify the length of the corresponding non-bridge outside walk. Since the length of each outside walk is at most $st$, there are at most $(st)^{\frac{\ell}{2}-b}$ ways for choosing the lengths of all non-bridge outside walks.
		
		Next, we choose the outside edges. Since we have already specified the non-bridge outgoing cut-edges, this already specifies one end-vertex of the first outside edge in each of the $\frac{\ell}{2}-b$ non-bridge outside walks, so we can specify the first outside edge by only specifying the second end-vertex. Similarly, we can iteratively specify all the outside edges by successively specifying the second end-vertex for each outside edge. Since there are $st-2h-p-q-\ell+b$ outside edges\footnote{Recall that the total number of cut edges is $\ell-b$.}, there are at most $n^{st-2h-p-q-\ell+b}$ ways for specifying them.
		
		Now we turn to specifying the non-bridge returning cut-edges. Since we have already specified the lengths of the $\frac{\ell}{2}-b$ non-bridge outside walks, we know when we have reached the last outside edge, and so the next edge in the walk must be a returning non-bridge cut-edge. Since the last outside edge of each non-bridge outside walk is already specified, this determines one end-vertex of the returning cut-edge and we only need to specify the other end-vertex that lies inside $V(B')$. Now since there are $\frac{\ell}{2}-b$ non-bridge returning cut-edges, there are at most $m^{\frac{\ell}{2}-b}$ ways to specify them.
		
		So far, we have completely specified the multi-graph structure of $H$. However, the block self-avoiding walk structure is more than that. We have to specify an Eulerian walk of $H$ which can be divided into $s$ subwalks that are self-avoiding.
		
		Now before starting to specify the vertices $v_1,\ldots,v_{st}$ in the sequence of the block self-avoiding walk $H$, we will further specify where exactly will the $q$ multiplicity-1 edges of $H(B')$ and the $\ell-b$ cut-edges appear in the sequence $(v_1v_2),\;(v_2,v_3),\ldots,(v_{st-1},v_{st}),\;(v_{st},v_1)$, i.e., for each edge $e$ among these edges, we will specify an index $k_e\in\{1,\ldots,st\}$ such that $e=(v_{k_e},v_{k_e+1})$ if $k_e<st$, and $e=(v_{st},v_1)$ if $k_e=st$. Since there are $st$ choices for each index, there are at most $(st)^{q+\ell-b}$ possibilities to choose all these indices. Let $\mathcal{K}$ be the set of indices that we obtain.
		
		We start by specifying the first and second vertices\footnote{We specified the first two vertices instead of only specifying the first vertex in order to know where to go in case $v_1\notin V(B')$. This will be made clear when we discuss how we choose the next vertex $v_{k+1}$ assuming that we already specified $v_1.\ldots,v_k$.}, which we denote as $v_1$ and $v_2$, respectively, and from there we iteratively specify each next vertex in the walk. We have at most $(st)^2$ choices for choosing the pair $(v_1,v_2)$.
		
		Assume that we have already specified the first $k$ vertices $v_1,\ldots,v_k$ of the walk, where $k\geq 2$. The next vertex $v_{k+1}$ must be adjacent to $v_k$ in $H$. Notice the following:
		\begin{itemize}
		\item If $v_k\notin V(B')$, then since we know both $v_{k-1}$ and $v_k$, we can deduce which non-bridge outside walk contains the edge $(v_{k-1},v_k)$. Now given that all the non-bridge outside walks have been specified, there is a unique choice for the vertex $v_{k+1}$: It is the next vertex in the outside walk.\footnote{It is important to realize here that when we specified the non-bridge outside walks, we did not only specify the structure of the graph that is formed by the cut-edges and the outside edges: We also specified how the block self-avoiding walk $H$ will pass through these edges and in what order.}
		\item If $k\in\mathcal{K}$, then the edge $(v_k,v_{k+1})$ has already been fixed to be one specific edge: It is either a cut-edge, or an edge of multiplicity 1 in $H(B')$. Therefore, there is a unique choice for $v_{k+1}$.
%		\item If $v_k \in V(B')$, there are at most $d^H(v)$ choices for $v_{k+1}$ if the vertex is a pivot\footnote{$v_k$ is a pivot if and only if $s$ divides $k-1$.} and $d^H(v)-1$ choices for $v_{k+1}$ if the vertex is not a pivot\footnote{If $v_k$ is not a pivot, then $v_{k+1}\neq v_{k}$. This reduces the number of possibilities for $v_{k+1}$ by 1.}. Now since $d^H(v)\geq 2$, we can use $2(d^H(v)-1)$ as an upper bound on the number of possibilities for $v_{k+1}$ when $v_k$ is a pivot. We note that each vertex is visited in $H$ for $\frac{m_H(v)}{2}$ times, where
%	 \begin{equation*}
%		 p_v=\sum_{e\ni v}m_H(e).
%	 \end{equation*}
%	 Furthermore, when a vertex $v\in V(B')$ is visited for the last time, there is only one choice for the next vertex.
%		\end{itemize}
%	From \eqref{eq:tree_high_degree_sum_upperboud_num_pivots}, we can deduce that there are at most $2(q+2h+p)/s+2\ell$ pivots in $V(B')$. Therefore, once we have fixed the multigraph, the $\frac{\ell}{2}$ outside walks, and the first two vertices, the number of remaining choices to specify the block self-avoiding walk $H$ is:
%	\begin{align*}
%	2^{2(q+2h+p)/s+2\ell}\prod_{v\in V(B')} \left(d^H(v)-1\right)^{\frac{m_H(v)}{2}-1}
%	\end{align*}
	\item If $v_k \in V(B_i)$ for some $i\in[z]$ and $k\notin \mathcal{K}$, the next edge must be an edge of multiplicity at least 2. Therefore, there are at most $d^H_{\geq 2}(v_k)\leq \Psi_i$ choices for $v_{k+1}$. We note that each vertex $v$ is visited in $H$ for $\frac{m_H(v)}{2}$ times, where
	 \begin{equation*}
		 m_H(v)=\sum_{\substack{e\in E(H),\\e\text{ is incident to }v}}m_H(e)\,.
	 \end{equation*}
	 Furthermore, when a vertex $v\in V(B')$ is visited for the last time, there is only one choice for the next vertex.
		\end{itemize}
	Therefore, once we have fixed the multigraph, the number $b$ of bridge cut-edges, the $\frac{\ell}{2}-b$ non-bridge outside walks, the indices of the cut-edges and the multiplicity-1 edges of $H(B')$ in the walk, and the first two vertices, the number of remaining choices to completely specify the block self-avoiding walk $H$ is at most:
	\begin{align*}
\prod_{i=1}^z\prod_{v\in V(B_i')} \Psi_i^{\frac{m_H(v)}{2}-1}&= \prod_{i=1}^z \Psi_i^{\sum_{v\in V(B_i')}\frac{m_H(v)}{2}-1}\\
	&\leq  \prod_{i=1}^z \Psi_i^{(8h_i+4p_i)/s+11q_i+8\ell_i+4p_i+6(h_i - m_i +1)}\,,
	\end{align*}
	where the last inequality follows from \Cref{lem:tree_high_degree_sum}.
	
	Now since $0\leq b\leq\frac{\ell}{2}$, we conclude that the size of $\cM_{s,t,\Set{\mathcal{F}_i,\Psi_i}_{i=1}^{z}}(\cB)$ can be upper-bounded as follows:
	\begin{align*}
	&\left|\cM_{s,t,\Set{\mathcal{F}_i,\Psi_i}_{i=1}^{z}}(\cB)\right|\\
	&\leq \sum_{b=0}^{\lfloor\frac{\ell}{2}\rfloor} n^m\cdot m^{2p}\cdot m^{2q}\cdot m^{2b}\cdot (nm)^{\frac{\ell}{2}-b}\cdot(st)^{\frac{\ell}{2}-b}\cdot n^{st-2h-p-q-\ell+b}\cdot m^{\frac{\ell}{2}-b}\cdot (st)^{q+\ell-b}\cdot (st)^2\\
	&\quad\quad\quad\quad\quad\quad\quad\quad \quad\quad\quad\quad\quad\quad\quad\quad\quad\quad\quad\quad \times \prod_{i=1}^z \Psi_i^{(8h_i+4p_i)/s+11q_i+8\ell_i+4p_i+6(h_i - m_i +1)}\\
	&= \left(\sum_{b=0}^{\lfloor\frac{\ell}{2}\rfloor}(st)^{-2b}\right) n^{st-2h-p-q-\frac{\ell}{2}+m}\cdot m^{2p+2q+\ell}\cdot  (st)^{3\frac{\ell}{2}+q+2}\cdot\prod_{i=1}^z \Psi_i^{(8h_i+4p_i)/s+11q_i+8\ell_i+4p_i+6(h_i - m_i +1)}\\
	&\leq \frac{1}{1-(st)^{-2}}\cdot n^{st-2h-p-q-\frac{\ell}{2}+m}\cdot m^{2p+2q+\ell}\cdot  (st)^{3\frac{\ell}{2}+q+2}\cdot\prod_{i=1}^z \Psi_i^{(8h_i+4p_i)/s+11q_i+8\ell_i+4p_i+6(h_i - m_i +1)}\\
	&\leq 2n^{st-2h-p-q-\frac{\ell}{2}+m}\cdot m^{2p+2q+\ell}\cdot  (st)^{3\frac{\ell}{2}+q+2}\cdot\prod_{i=1}^z \Psi_i^{(8h_i+4p_i)/s+11q_i+8\ell_i+4p_i+6(h_i - m_i +1)}\,,
	\end{align*}
	where the last inequality is true when $st\geq 2$.

	 \end{proof}
 \end{lemma}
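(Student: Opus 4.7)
The plan is to prove the bound by an explicit encoding argument that specifies a block self-avoiding walk in $\cM_{s,t,\Set{\cF_i,\Psi_i}_{i=1}^{z}}(\cB)$ via a sequence of choices, and then to count the number of options at each step. Since \Cref{lem:sbm-encoding-of-H-B-equivalent} above gives an equivalent (and more explicit) restatement of the target bound, I will work with the factors $n^{st-2h-p-q-\ell/2+m}$, $m^{2p+2q+\ell}$, $(st)^{3\ell/2+q+2}$ and the per-component $\Psi_i^{(8h_i+4p_i)/s+11q_i+8\ell_i+4p_i+6(h_i-m_i+1)}$, which then fold into the $f_{s,t}$ and $g_{s,t}$ expressions of the main statement.

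First I would fix embeddings of $B_1,\dots,B_z$ into $[n]$, which costs at most $n^m$. Then I would specify the induced multigraph $H(B')$ on $V(B')=\bigcup_i V(B_i')$: the $p_i$ multi-edges of multiplicity $\geq 3$ are encoded by picking their endpoints (at most $m^{2p}$) and the $q_i$ extra multiplicity-$1$ edges inside $V(B_i')$ similarly (at most $m^{2q}$). Next I would handle the "outside" part of $H$: the $\ell$ cut-edges split into $b$ bridge cut-edges (both endpoints in $V(B')$, at most $m^{2b}$ options) and $\ell/2-b$ non-bridge outgoing cut-edges with their external endpoints (at most $(nm)^{\ell/2-b}$), together with lengths of the corresponding outside walks (at most $(st)^{\ell/2-b}$), the intermediate outside vertices chosen one by one ($n^{st-2h-p-q-\ell+b}$), and the returning cut-edges ($m^{\ell/2-b}$).

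Next I would impose the walk structure. I would pre-assign each of the $q+\ell-b$ multiplicity-$1$ edges (internal to $V(B')$ or cut-edges) a position in the sequence of $st$ steps, giving at most $(st)^{q+\ell-b}$ choices, and then specify the first two vertices $(v_1,v_2)$ at cost $(st)^2$. From there the walk evolves deterministically except at vertices $v_k\in V(B_i)$ for which no position from the preassigned index set is active: the next edge is forced to have multiplicity $\geq 2$, giving at most $\Psi_i$ options. Since each vertex $v$ is visited $m_H(v)/2$ times and the final visit is forced, the total number of such branching choices inside $V(B_i)$ is bounded by $\sum_{v\in V(B_i'),\,d^H(v)\geq 3}\bigl(m_H(v)/2-1\bigr)$. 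Summing the geometric series in $b\in\{0,\dots,\lfloor\ell/2\rfloor\}$ absorbs into the overall factor of $2$, and multiplying all contributions gives exactly the bound of \Cref{lem:sbm-encoding-of-H-B-equivalent}.

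The main obstacle is the branching at high-degree vertices of $B_i$, since a naive bound of $\Psi_i^{m_H(v)/2-1}$ per vertex could blow up. This is precisely the content of \Cref{lem:tree_high_degree_sum}, which controls $\sum_{v\in V(B_i')}(m_H(v)/2-1)$ by a spanning-tree argument: leaves of a spanning tree $B^{\ast}\subseteq B_i$ must be either incident to a cut-edge, incident to an edge of $G(H(B_i))\setminus B^{\ast}$, or a pivot of $H$, and the number of pivots in $V(B_i')$ is bounded via \eqref{eq:tree_high_degree_sum_upperboud_num_pivots} by counting how many of the $t$ self-avoiding walks of $H$ intersect $H(B_i)$. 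I would invoke this lemma directly, so the bulk of the remaining work is bookkeeping: assembling the per-component factors, summing in $b$, and verifying that the exponents line up with the claimed $f_{s,t}$ and $g_{s,t}$.
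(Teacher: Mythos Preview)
Your proposal is correct and follows essentially the same approach as the paper: the same encoding in the same order (embed the $B_i$, specify $H(B')$ via the $p$ high-multiplicity multi-edges and $q$ internal multiplicity-$1$ edges, handle the outside walks with the bridge/non-bridge cut-edge split parameterized by $b$, pre-assign positions for the $q+\ell-b$ multiplicity-$1$ edges, fix $(v_1,v_2)$, bound the branching inside each $B_i$ via \Cref{lem:tree_high_degree_sum}, and sum the geometric series in $b$). The factors you list match the paper's exactly.
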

 
% This is actually equivalent to \ref{lem:sbm-encoding-of-multigraph-with-B}.
% \begin{proof}[Proof of Lemma \Cref{lem:sbm-encoding-of-multigraph-with-B}]
%	 Recall that $\tau$ is the upper bound on the degree-1 of vertices, we have $\phi_i\leq \tau+\Phi_i$. Applying lemma \ref{lem:sbm-encoding-of-H-B-equivalent}, we get the claim.
% \end{proof}

\subsubsection{Counting nice block self-avoiding walks}
	We split the proof of \Cref{lem:nice-bsaw-upper-bound-m-z} into two steps.
	\begin{lemma}\label{lem:nice-counting-multigraph-lemma}
	We denote $\text{NBSAW}_{s,t,m,z,t_F}$ as the subset of $\text{NBSAW}_{s,t,m,z}$, in which all nice block self-avoiding walks
	contain $t_F$ pivoting vertices incident to any multiplicity-$2$ edge. Then
	there are at most $n^{\ell+m-z}(20s)^{2t_F} {\ell\choose z}{m+z-1\choose z}{t+z-1\choose z}$ different multi-graphs associated with the block self-avoiding walks in $\text{NBSAW}_{s,t,m,z,t_F}$
	\begin{proof}
	For any $H\in \text{NBSAW}_{s,t,m,z}$, we denote the forest formed by all the multiplicity-2 edges in $H$ as $F_{H}$. Then we note the leaves in $F_{H}$ must be pivots in $H$,
	and there are $t$ pivots in $H$. Thus we have $z\leq t$. We denote the length of the cycle as $\ell$. Since there are $m-z$ multiplicity-2 edges, we have $\ell=st-2(m-z)$.	
	We pick the vertices in the cycle and choose the roots of trees in $F_{H}$. There are at most $n^{\ell}{\ell\choose z}$ such choices.
	 
	Next we generate each of the $z$ trees. We first fix the number of vertices and leaves in each tree. Since there are at most $t$ leaves and $m$ vertices in $F_{H}$, 
	we have at most ${m+z-1\choose z}{t+z-1\choose z}$ choices, which is also bounded by $(2es)^t$. 
	 
	Then we bound the number of choices for each tree $B_i$ in the $F_{H}$ given the number of vertices $m_i$ and the number of leaves $p_i$. Using lemma \Cref{fact:bound-unlabeled-trees}, there are at most ${2m_i\choose 2t_i}2^{2t_i}$ shapes for a tree with $m_i$ vertices and $p_i$ leaves.  Multiplying together we have the number of choices bounded by
	 \begin{equation*}
		 \prod_{i=1}^z {2m_i\choose 2p_i}2^{2p_i}\leq 2^{2p} \prod_{i=1}^z
		 {2m_i\choose 2p_i}
	 \end{equation*}
	 where $p=\sum_{i=1}^z p_i$ is the total number of leaves in the forest.
	The inequality follows from the fact that each leaf must be a pivot. Further we observe that $m_i\leq s(t_i+2)$, where $t_i\geq z_i$ is the number of pivots contained
	 in the tree $B_i$. This follows since the tree is at most incident to $t_i+2$ blocks of self-avoiding walks in $H$. Thus we have
	\begin{equation*}
		\prod_{i=1}^z
		 {2m_i\choose 2p_i}\leq \prod_{i=1}^z{2m_i\choose 2t_i}\leq \prod_{i=1}^z 
		 \left(\frac{em_i}{t_i}\right)^{2t_i}\leq  \prod_{i=1}^z (10s)^{2t_i}
	\end{equation*}
	We denote $t_F=\sum_{i=1}^z t_i$ as the number of pivots contained in the forest. Then we have
	\begin{equation*}
		\prod_{i=1}^z {2m_i\choose 2p_i}\leq (10s)^{2t_F}
	\end{equation*}
	
	Finally there are $n^{m-z}$ ways of choosing vertices in the trees which are not contained by the multiplicity-1 cycle.
	Therefore in all, we have at most $n^{\ell+m-z}(20s)^{2t_F} {\ell\choose z}{m+z-1\choose z}{t+z-1\choose z}$ choices for 
	the multigraph associated with such block self-avoiding walks. This leads to the claim. 
	\end{proof}
	\end{lemma}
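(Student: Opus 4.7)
The plan is to enumerate block self-avoiding walks in $\nbsaw{s}{t,m,z,t_F}$ by successively choosing: (i) the multiplicity-$1$ cycle $C$, (ii) the attachment points on $C$ for each tree in the multiplicity-$2$ forest $F_H$, (iii) the isomorphism types and sizes of the trees comprising $F_H$, (iv) the remaining vertex labels of $F_H$, and then multiply the counts.

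First I would set $\ell = st - 2(m-z)$, which is forced since $\Card{E_1(H)} = st - 2\Card{E_{\geq 2}(H)}$ and $\Card{E_{\geq 2}(H)} = m-z$ for a forest on $m$ vertices with $z$ components. Choosing an ordered cycle of length $\ell$ on $[n]$ gives at most $n^\ell$ options, and selecting which $z$ of these cycle-vertices serve as roots of the $z$ trees contributes $\binom{\ell}{z}$. Next, to distribute the $m$ forest vertices among the $z$ trees and distribute the $t_F$ pivots among them, use stars-and-bars: $\binom{m+z-1}{z}\binom{t+z-1}{z}$. (Here I use $t_F \leq t$ in the second factor.)

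The heart of the argument is the tree-shape count. For a tree $B_i$ on $m_i$ vertices with $t_i$ leaves, invoke the standard unlabeled-tree bound (Fact~\ref{fact:bound-unlabeled-trees}) to get at most $\binom{2m_i}{2t_i} 2^{2t_i}$ shapes. Since $H$ is nice, every leaf of $B_i$ is incident only to multiplicity-$2$ edges except possibly at the root, so each leaf must be a pivot of $H$; hence $p_i \leq t_i$ where $p_i$ is the number of leaves. The key structural observation is that $B_i$ can be visited by at most $t_i+2$ distinct self-avoiding subwalks of $H$ (entering/exiting at the root, plus one per pivot inside $B_i$), so $m_i \leq s(t_i+2) \leq 10 s t_i$. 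Combining $\binom{2m_i}{2t_i} \leq (em_i/t_i)^{2t_i} \leq (10es)^{2t_i}$ and telescoping over $i$ yields $\prod_i \binom{2m_i}{2t_i}2^{2t_i} \leq (20s)^{2t_F}$.

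Finally, labeling the non-root vertices of $F_H$ with elements of $[n]$ costs $n^{m-z}$. Multiplying the factors produces
\[
n^{\ell}\cdot\binom{\ell}{z}\cdot\binom{m+z-1}{z}\binom{t+z-1}{z}\cdot(20s)^{2t_F}\cdot n^{m-z},
\]
which is the claimed bound. The only delicate step is the inequality $m_i \leq s(t_i+2)$; the rest is routine combinatorial bookkeeping. I would devote the bulk of the writeup to justifying that bound carefully from the definition of nice walks and the single-vertex attachment property of each forest component to $E_1(H)$.
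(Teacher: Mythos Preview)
Your proposal is correct and follows essentially the same approach as the paper: the same decomposition into cycle, root choices, vertex/leaf distribution via stars-and-bars, tree-shape count via Fact~\ref{fact:bound-unlabeled-trees}, and the same key structural inequality $m_i\leq s(t_i+2)$ that you correctly flag as the delicate step. The paper's writeup differs only cosmetically (it first bounds by the number of leaves $p_i$ before passing to $p_i\leq t_i$), so your plan would reproduce the argument faithfully.
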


	The following lemma is then a simple corollary:
	\begin{lemma}\label{lem:nice-counting-multigraph}
		There are at most $n^{st-m+z}(20s)^{4t}$ different multi-graphs 
		associated with the 
		block self-avoiding walks in $\text{NBSAW}_{s,t,m,z}$
		\begin{proof}
			From the above lemma \Cref{lem:nice-counting-multigraph-lemma}, we already have bound
			$n^{\ell+m-z}(20s)^{2t_F} {\ell\choose z}{m+z-1\choose z}{t+z-1\choose z}$. Next we note
			 that $z\leq t$, thus we have ${\ell\choose z}\leq (es)^t$ and ${t+z-1\choose z}\leq 2^{2t}$.
			 Further we have $m\leq st$, thus ${m+z-1\choose z}\leq (2es)^t$. 
			 Therefore for $z\leq t_F\leq t$, we have 
			 \begin{align*}
				n^{\ell+m-z}(20s)^{2t_F} {\ell\choose z}{m+z-1\choose z}{t+z-1\choose z}
				\leq n^{\ell+m-z}(20s)^{2t_F}(2es)^{2t}
			 \end{align*}
			 Finally by summing $t_F$ from $z$ to $t$, we have the claim. 
		\end{proof}
	\end{lemma}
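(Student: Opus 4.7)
The plan is to deduce this statement directly from \Cref{lem:nice-counting-multigraph-lemma} by summing the bound there over the allowed values of $t_F$ and then absorbing all the combinatorial factors into $(20s)^{4t}$.

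First, I would observe that the exponent of $n$ already matches: since every edge of multiplicity $2$ in a nice block self-avoiding walk in $\nbsaw{s}{t,m,z}$ is traversed twice and the multiplicity-$1$ cycle has $\ell$ edges, we have $st=\ell+2(m-z)$, i.e.\ $\ell+m-z=st-m+z$. So \Cref{lem:nice-counting-multigraph-lemma} already contributes exactly $n^{st-m+z}$, and the only remaining task is to bound the prefactor
\[
(20s)^{2t_F}\cdot\binom{\ell}{z}\binom{m+z-1}{z}\binom{t+z-1}{z}
\]
uniformly in $t_F\in\{z,\ldots,t\}$ (and then take a union over the at most $t+1$ choices of $t_F$, which costs a negligible polynomial factor).

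Next I would bound each binomial factor using the elementary inequality $\binom{a}{b}\le (ea/b)^b$ together with the constraints $z\le t$, $\ell\le st$, and $m\le st$. For fixed $z\le t$ the function $z\mapsto (e\ell/z)^z$ is increasing on $[0,\ell]$, so its maximum over $z\le t$ is attained at $z=t$, giving $\binom{\ell}{z}\le (es)^t$. The same monotonicity argument gives $\binom{m+z-1}{z}\le (2es)^t$, while $\binom{t+z-1}{z}\le \binom{2t-1}{t-1}\le 2^{2t}$. Multiplying these together yields a bound of $(Cs^2)^t$ for an absolute constant $C$. Since $t_F\le t$, the factor $(20s)^{2t_F}$ is at most $(20s)^{2t}$, and multiplying the two produces a bound of the form $(C's^4)^t$ for an absolute constant $C'<(20)^4$, which is dominated by $(20s)^{4t}$.

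I do not expect any real obstacle here: the statement is a bookkeeping corollary and essentially all of the technical work (the graph-counting argument itself) lives in \Cref{lem:nice-counting-multigraph-lemma}. The only care needed is in the two binomial estimates that require $z\le t$ (which is immediate since each tree component in the multiplicity-$2$ forest contains at least one pivot, so $z\le t_F\le t$) and in confirming the identity $\ell+m-z=st-m+z$, both of which are straightforward.
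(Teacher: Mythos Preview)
Your proposal is correct and follows essentially the same route as the paper: invoke \Cref{lem:nice-counting-multigraph-lemma}, bound each of the three binomial coefficients exactly as you do (and as the paper does, with the same constants $(es)^t$, $(2es)^t$, $2^{2t}$), use $t_F\le t$ on the $(20s)^{2t_F}$ factor, and absorb the union over $t_F$. Your explicit verification that $\ell+m-z=st-m+z$ is a useful addition that the paper leaves implicit.
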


	Finally we bound the number of nice block self-avoiding walks. For proving this,
	we observe a simple fact which will be useful for counting nice block self-avoiding walks
	 here and future subsections.
	We say that the sequence of $H$ enters a tree $T$ at step $k$ if the $k$-th vertex
	is not in $T$ and the $k+1$-th vertex is the root of $T$. Similarly we say that the sequence 
	of $H$ leaves a tree $T$ at step $k$ if the $k$-th vertex
	is the root of $T$ and the $k+1$-th vertex is not in $T$. 
	\begin{fact}
		For each tree formed by a set of multiplicity-$2$ edges in the nice block self-avoiding walk
		$H$, once the sequence of $H$ enters the tree through a cut, then 
		all of the edges in the tree must be visited twice before the sequence of $H$
		leaves the tree through any cut edge. 
	\end{fact}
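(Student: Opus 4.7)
The plan is to analyze the walk's behavior locally at the unique vertex $r$ where the tree $T$ attaches to the multiplicity-$1$ cycle $E_1(H)$; such a vertex exists and is unique by the definition of a nice block self-avoiding walk. Since $H$ is a closed Eulerian walk of its underlying multigraph and $V(T)\cap V(E_1(H)) = \{r\}$, I would first decompose $H$ into \emph{excursions}: maximal closed sub-walks from $r$ back to $r$ that do not revisit $r$ in between. Because $T$ and the cycle share only the vertex $r$, each such excursion lies entirely in $T$ (a \emph{tree excursion}) or entirely on the cycle (a \emph{cycle excursion}).

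Next, I would carry out an endpoint count at $r$. Let $k$ be the degree of $r$ inside $T$; then $r$ carries $2$ cycle-edge endpoints (the two cycle edges of multiplicity $1$) and $2k$ tree-edge endpoints ($k$ tree edges, each of multiplicity $2$), so $r$ is visited exactly $k+1$ times. Each cycle excursion consumes both cycle-endpoints at $r$, which forces the number of cycle excursions to be exactly $1$ and hence the number of tree excursions to be exactly $k$. Label the visits in cyclic order as $v_1,\ldots,v_{k+1}$, and record for each $v_j$ its incoming/outgoing edge types $(I_j,O_j)\in\{\mathrm{cycle},\mathrm{tree}\}^2$. The unique cycle excursion pins down one consecutive pair $(v_{j^\star},v_{j^\star+1})$ with $O_{j^\star} = I_{j^\star+1} = \mathrm{cycle}$; and since there is only one cycle-in and one cycle-out endpoint at $r$, we are forced to have $(I_{j^\star},O_{j^\star})=(\mathrm{tree},\mathrm{cycle})$ (the unique ``exit through a cut''), $(I_{j^\star+1},O_{j^\star+1})=(\mathrm{cycle},\mathrm{tree})$ (the unique ``entrance through a cut''), and every other visit of type $(\mathrm{tree},\mathrm{tree})$. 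A small subtlety is to rule out a degenerate cycle-to-cycle pass-through visit with $I_j=O_j=\mathrm{cycle}$, which the same endpoint count forbids because it would demand two cycle excursions and hence four cycle-endpoints.

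To finish, I would observe that in the cyclic order on visits, all $k$ tree excursions sit between the entrance $v_{j^\star+1}$ and the exit $v_{j^\star}$: during this stretch the walk never traverses a cut edge. These $k$ tree excursions jointly contain every multiplicity-$2$ edge traversal of $H$, of which there are exactly $2|E(T)|$ (each edge of $T$ has multiplicity exactly $2$). Hence each edge of $T$ is traversed exactly twice in the sub-walk from entrance to exit, which is the claim. The main obstacle is really just to set up the excursion decomposition correctly and to be careful with the Eulerian endpoint bookkeeping at $r$; once those are in place the counting is forced by the nice-BSAW structure that makes $r$ a cut vertex separating $T$ from the rest of $H$.
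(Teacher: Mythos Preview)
Your argument is correct and rests on the same underlying fact as the paper's: the tree $T$ meets the rest of $H$ only at the single attachment vertex $r$, so the edge cut separating $V(T)$ from $V(H)\setminus V(T)$ has total multiplicity exactly $2$. The paper's proof is considerably more direct, however: it simply observes that this cut consists of two multiplicity-$1$ edges (or, in the slightly more general phrasing it gives, one multiplicity-$2$ edge), so the closed Eulerian walk crosses the cut exactly twice---once in, once out---and therefore cannot re-enter $T$ after leaving; all tree-edge traversals must occur in between. Your excursion decomposition and endpoint bookkeeping at $r$ recover this conclusion but work harder than necessary: the visit classification, the degenerate pass-through case you rule out, and the counting of tree versus cycle excursions are all subsumed by the one-line cut-crossing count.
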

	\begin{proof}
		We note that the cutting edges of any multiplicity-$2$ tree are either
		one multiplicity-$2$ edge or two multiplicity-$1$ edges. Thus if the sequence
		of $H$ enters the tree through a cut and then leaves through a cut, there is no
		cutting edge to take such that the sequence can enter the tree again. 
		Therefore all of the edges in this tree must be visited between the entering cut edge
		 and leaving cut edge. 
	\end{proof}

Now we conclude the bound on the number of nice block self-avoiding walks. 
\begin{lemma}
	 The size of $\text{NBSAW}_{s,t,m,z}$ is bounded by
	 $$\Delta^{3t} (40s)^{4t}n^{st-m+z}$$, where $\Delta\leq st$ is the upper bound on the 
	 maximum degree $\geq 2$ in the nice block self-avoiding walks.
\end{lemma}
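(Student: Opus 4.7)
The plan is to decompose the count of nice block self-avoiding walks into two factors: the number of distinct multi-graph realizations, and, for each such multi-graph, the number of nice BSAWs sharing this multi-graph. The first factor is already bounded by $n^{st-m+z}(20s)^{4t}$ thanks to \Cref{lem:nice-counting-multigraph}, so the work is reduced to bounding the second factor by $\Delta^{3t}\cdot 2^{4t}$.

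Fix a multi-graph $H$ in $\text{NBSAW}_{s,t,m,z}$. By the definition of nice BSAW (\Cref{def:def_nice_bsaw}), $E_1(H)$ is a cycle of length $\ell=st-2(m-z)$ and $E_{\geq 2}(H)$ is a forest of $z$ multiplicity-two trees, each attached to the cycle through a single root vertex. A BSAW with underlying multi-graph $H$ is an Eulerian traversal of $H$ together with a partition into $t$ consecutive length-$s$ self-avoiding walks. I would count such traversals as follows: first specify the starting pivot (at most $st$ choices, which is absorbed into $2^{4t}$ in our parameter regime $t=\Theta(\log n)$, $s=\mathrm{poly}(\log)$) and the cycle direction ($2$ choices); then, at each of the $t$ pivots, specify the first edge of the next SAW, giving at most $\Delta^t$ choices in total; finally, at each branching vertex encountered inside a tree, specify the order in which the unexplored subtrees will be visited. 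Using the fact recalled just above the lemma---once the walk enters a multiplicity-two subtree through a cut edge, it must complete the traversal of the whole subtree before leaving through another cut---these tree orderings can be organized per-SAW, and each SAW contributes at most $\Delta^2$ ordering choices (since each SAW visits at most a bounded number of branching vertices before being forced by the no-revisit constraint to exit the tree). Multiplying, each SAW contributes at most $\Delta^3$ ordering choices, giving the factor $\Delta^{3t}$. Combining with the multi-graph count yields
\begin{equation*}
|\text{NBSAW}_{s,t,m,z}| \leq n^{st-m+z}(20s)^{4t}\cdot \Delta^{3t}\cdot 2^{4t} = \Delta^{3t}(40s)^{4t}n^{st-m+z}.
\end{equation*}

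The main obstacle is making the per-SAW accounting of tree-ordering choices rigorous. The naive bound coming from the total number of DFS orderings on the forest is $\prod_v d_T(v)!\leq \Delta^{2(m-z)}$, which is far too weak when $m-z \gg t$. Sharpening this to $\Delta^{3t}$ requires charging each subtree-ordering decision to a specific SAW: one uses that every tree leaf must be a pivot (else the enclosing SAW would revisit a vertex) to conclude that the total number of branching vertices in all trees is $O(t)$, and then uses the length-$s$ structure of each SAW, together with the fact (highlighted in the excerpt) that a SAW entering a subtree through a cut-edge must finish it before leaving, to show that each SAW absorbs at most a constant number of such branching decisions, each with at most $\Delta$ options. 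This careful accounting is the technical heart of the proof; once it is in place, the rest is bookkeeping of the constants that are absorbed into the $(40s)^{4t}$ factor.
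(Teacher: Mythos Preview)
Your overall decomposition---multigraph count times walk count per multigraph---is exactly the paper's, and you correctly invoke \Cref{lem:nice-counting-multigraph} for the first factor. You also correctly isolate the key combinatorial fact: since every leaf of a multiplicity-2 tree must be a pivot, the forest has at most $t$ leaves.

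The gap is in how you turn this into the bound $\Delta^{3t}$ on the walk count. Your proposed per-SAW charge of $\Delta^2$ ordering choices is not justified and in fact can fail: a single length-$s$ SAW traversing a deep binary tree follows a root-to-leaf path that passes through $\Theta(s)$ branching vertices, giving $2^{\Theta(s)}$ choices for that SAW alone. The separate $\Delta^t$ factor for ``first edge of each SAW'' is also redundant---once the Eulerian traversal of the multigraph is fixed, the SAW decomposition is determined by the starting pivot, so these are not independent choices.

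The paper closes the gap with a global rather than per-SAW argument. Having fixed the multigraph, the number of Eulerian traversals is governed by the number of DFS orderings of the $z$ multiplicity-2 trees, which is $\prod_{v:\deg_B(v)\ge 3}(\deg_B(v)-1)!$. The leaf bound (at most $t$ leaves in the forest) combined with the elementary tree identity $\#\{\text{leaves}\} = 2 + \sum_{v:\deg\ge 3}(\deg(v)-2)$ gives, summing over the $z$ trees,
\[
\sum_{v:\deg_B(v)\ge 3}\bigl(\deg_B(v)-1\bigr)\;\le\; 2t+z\;\le\; 3t.
\]
Since each factor $(\deg_B(v)-1)!\le \Delta^{\deg_B(v)-1}$, the product is at most $\Delta^{3t}$. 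The remaining choices (first edge, and the exit edge/index if the walk starts inside a tree) cost only $O((st)^2)$, absorbed into the $2^{4t}$ that upgrades $(20s)^{4t}$ to $(40s)^{4t}$. So the ``careful per-SAW accounting'' you flagged as the technical heart is unnecessary; the direct degree-sum bound is both simpler and correct.
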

	\begin{proof}
	 By lemma \Cref{lem:nice-counting-multigraph}, there are at most $n^{st-m+z}(20s)^{4t}$ multigraphs associated with block self-avoiding walks in the set $\text{NBSAW}_{s,t,\ell,\Delta}$.
	Then fixing the associated multi-graph, we choose the block self-avoiding walk. 
	
	By truncation, the maximum degree-$\geq 2$ of vertices is bounded by $\Delta$. There are at most $t$ leaves in the forest of multiplicity-2 edges, and other vertices have degree at least $2$. 
	Therefore as in the proof of \Cref{lem:tree_high_degree_sum}, for one of the $z$
	trees $B$, we have
	\begin{equation*}
		\sum_{\substack{v\in V(B)\\ \text{deg}_{B}(v)\geq 3}} (\text{deg}_{B}(v)-1)\leq 2t+z\leq 3t
	\end{equation*}
	By the above fact, for each root of the forest, all of the edges in  associated tree must be visited twice in the sequence of $H$
	before the next adjacent vertex in the cycle is taken unless it is root of the tree containing the 
	first vertex in the block self-avoiding walk. 
	Further we denote $r$ as the root of $B$, then $\text{deg}_B(r)$ is smaller than the number of leaves
	 in $B$. 

	 For choosing the block self-avoiding walk based on the multigraph, we first decide
	 the first edge in the walk, which takes at most $\Delta st$ choices. Next if the
	 first edge is contained in a multiplicity-$2$ tree $B$, then we choose the 
	the cut edge leaving $B$ and its index in the sequence of $H$, which takes $2st$ choices.
	After these two steps, the multiplicities of edges in each index of the sequence of $H$
	is fixed. Finally we generate the sequence of $H$ respecting these previous two steps.
	Then the number of choices for the block self-avoiding walk given the associated multigraph is bounded by
	\begin{align*}
		\Delta st\cdot 2st\prod_B\left[\prod_{\substack{v\in V(B)\\ \text{ded}_{B}(v)\geq 3}}(\text{deg}_{B}(v)-1)
		\right]\leq 2^{t} \Delta{3t}
	\end{align*}	
	In all, we have the total number of such block self-avoiding walks bounded by $\Delta^{3t} (40s)^{4t}n^{\ell-z+m}$
\end{proof}
The lemma \label{lem:nice-bsaw-upper-bound} is a simple corollary
\begin{lemma}[Restatement of \cref{lem:nice-bsaw-upper-bound-m-z}]\label{lem:nice-bsaw-upper-bound}
	Consider the settings of \cref{thm:sbm-schatten-norm}. Let $r\geq 0$ be an integer. For $n$ large enough
	\begin{align*}
		\underset{\substack{m,z\geq 0\\
		m-z=r}}{\sum}\Card{\nbsaw{s}{t,m,z}}\leq \Delta^{5t+\Delta}n^{st-r}\,.
	\end{align*}
\end{lemma}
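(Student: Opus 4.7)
The plan is to deduce Lemma~\ref{lem:nice-bsaw-upper-bound-m-z} directly from the preceding (unnumbered) bound in the excerpt, which establishes that for every individual pair $(m,z)$ we have
\[
\bigl|\nbsaw{s}{t,m,z}\bigr|\le \Delta^{3t}\,(40s)^{4t}\cdot n^{st-m+z}.
\]
Since the summation in the target inequality is over pairs $(m,z)$ satisfying the single linear constraint $m-z=r$, the per-term bound simplifies uniformly to $\Delta^{3t}(40s)^{4t}\cdot n^{st-r}$, independent of the particular $(m,z)$. Thus the remaining task is just to count how many pairs appear in the sum and then absorb the combinatorial overhead and the $(40s)^{4t}$ factor into $\Delta^{\Delta+2t}$ using the size of $\Delta$.

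First, I would observe that for any $H\in\nbsaw{s}{t,m,z}$ the forest $E_{\ge 2}(H)$ has $m-z$ edges, and since $H$ contains only $st$ multi-edges in total, the parameters satisfy $0\le z\le m\le st$. Consequently, the set of pairs $(m,z)$ with $m-z=r$ that contribute a nonempty $\nbsaw{s}{t,m,z}$ is indexed by $z\in\{0,1,\dots,st-r\}$, giving at most $st+1\le 2st$ terms. Plugging this into the per-term bound yields
\[
\sum_{\substack{m,z\ge 0\\ m-z=r}}\bigl|\nbsaw{s}{t,m,z}\bigr|\;\le\; 2st\cdot \Delta^{3t}\,(40s)^{4t}\cdot n^{st-r}.
\]

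It remains to verify that $2st\cdot(40s)^{4t}\le \Delta^{\Delta+2t}$, so that the prefactor is swallowed by $\Delta^{5t+\Delta}/\Delta^{3t}$. Taking logarithms, this is equivalent to
\[
\log(2st)+4t\log(40s)\;\le\;(\Delta+2t)\log\Delta.
\]
Using $t=\tfrac{\log n}{400}$ and the definition of $\Delta$ in \eqref{eq:Delta-form}, we have $\Delta\ge 40Asd\ge 40000\,s^4 d$, so $\log\Delta\ge \log(40s)+\Omega(\log s)$ and the right-hand side exceeds the left by a comfortable margin (the dominant term $2t\log\Delta$ alone already dominates $4t\log(40s)$, and the $\Delta\log\Delta$ term absorbs $\log(2st)=O(\log\log n)$). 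This gives the claimed bound $\Delta^{5t+\Delta}\,n^{st-r}$ for $n$ large enough.

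The argument is essentially bookkeeping: the nontrivial work was already done in establishing the per-$(m,z)$ estimate (via Lemmas~\ref{lem:nice-counting-multigraph-lemma} and \ref{lem:nice-counting-multigraph} and the preceding counting of walks given the underlying multigraph). The only potential pitfall is ensuring the logarithmic inequality above really holds for the specified ranges of $s$, $t$, and $\Delta$; this is why the statement of the lemma uses the generous exponent $5t+\Delta$ rather than something tighter like $5t$, which would fail for small $d$ but moderate $\log(1/\eps)$.
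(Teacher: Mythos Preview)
Your proof is correct and follows essentially the same route as the paper: both start from the per-$(m,z)$ bound $|\nbsaw{s}{t,m,z}|\le \Delta^{3t}(40s)^{4t}n^{st-(m-z)}$, observe that the right-hand side depends only on $m-z=r$, count the number of admissible pairs, and absorb the resulting prefactor into a power of $\Delta$. The paper uses the sharper observation $z\le t$ (so at most $t+1$ terms) and the cruder inequality $\Delta\ge s^2$, arriving at $t^2\cdot 40^{4t}\cdot\Delta^{5t}\,n^{st-r}$ which it then bounds by $\Delta^{6t}n^{st-r}$; you use the looser count $z\le st$ but the stronger $\Delta\ge 40Asd\ge 40000s^4$, and the bookkeeping differences wash out. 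One small remark: your final parenthetical suggesting that the exponent $5t$ alone ``would fail for small $d$'' is not quite right---the slack $2t(\log\Delta-2\log(40s))\ge 2t\log(25s^2)$ already absorbs the $\log(2st)$ term without using $\Delta^{\Delta}$ at all---but this does not affect the validity of your argument for the stated bound.
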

\begin{proof}
	Since $\Delta\geq s^2$, we have $\Delta^{3t} (40s)^{4t}n^{\ell-z+m}\leq (40)^{4t}\Delta^{5t}n^{\ell-r}$
	No we sum $m,z$ from $0$ to $t$,
	\begin{align*}
		\underset{\substack{m,z\geq 0\\
		m-z=r}}{\sum}\Card{\nbsaw{s}{t,m,z}}\leq t^2(40)^{4t}\Delta^{5t}n^{st-r}\leq \Delta^{6t}
		\,.
	\end{align*}
\end{proof}

\subsubsection{Counting refined set of nice block self-avoiding walk}

Here we prove \cref{lem:counting-special-nbsaw-centered}

\begin{lemma}[Restatement of  \cref{lem:counting-special-nbsaw-centered}]\label{lem:counting-special-nbsaw-centered-appendix}
	Consider the settings of \cref{thm:sbm-schatten-norm}. Let $m,z\geq0$ be integers such that $0\leq m-z< \frac{t\sqrt{s}}{2}$. Then for $n$ large enough
	\begin{align*}
		\Card{\text{N}_{w,q, m,z}}\leq 2^{ws}\cdot  \Paren{1+\delta}^{\frac{t}{10}}\cdot n^{-q}\cdot \Card{\nbsaw{s}{t, d^H\leq \Delta, m',z'}}
	\end{align*}
	for some $m',z'$ such that $m'-z'=m-z-q$.
	Moreover, it holds that $q>t-2t/\sqrt{s}$.
	\begin{proof}
		Let $H\in \text{N}_{w,q,m,z}$. Denote by $q^*$ the number of pivots in $H$ that are incident to an edge of multiplicity $2$. 
		We have 
		\[
		\Card{E_{\geq 2}(H)} = m -z\geq q + (q^*-q)\cdot s/2
		\]
		since if a pivot has edges of multiplicity $2$ and it is not a leaf, then one of the adjacent walks must have all its edges of multiplicity $2$.
		By definition of $\text{N}_{w,q,m,z}$, we must have $q^*\geq t-2t/s$ since otherwise $w\geq t/s$ and the set is empty.
		Suppose now that $q\leq q^*-t/\sqrt{s}$. By the above reasoning we have $m-z\geq \frac{t\sqrt{s}}{2}$ contradicting our assumption.
		Thus it must be that $q>q^*-t/\sqrt{s}\geq t-2t/\sqrt{s}$.
		
		Now we need to bound the number of different ways in which we can choose $w$ walks to be in $W_1(H)$, $q^*$ pivots among $t$ and $q$ leaves among $q^*$ pivots. This can be bounded by
		%Now the number of different ways in which we can choose $q^*$ pivots among $t$, given that $w$ walks are in $W_1(H)$ can be bounded
		%by
		\begin{align}\label{eq:bound-ways-picking-pivots}
			\binom{q^*}{t-q^*-w}\binom{w}{t-q^*-w}\binom{q^*}{q}\leq 2^{t/\sqrt{s}}\cdot \binom{q^*}{t-q^*-w}^2\leq 2^{t/\sqrt{s}}\cdot \binom{q^*}{w}^2\leq s^{10t/s}\leq (1+\delta)^{t/10}\,.
		\end{align}
		
		We are almost ready to carry out the counting argument. \Tnote{To update from here}
		First we introduce additional notation.
		For a nice block self-avoiding walk $H_1$ with edges $u_1u_2, u_2u_3, u_3,u_4$ of multiplicity $1$. We say that $H_2$ is obtained from $H_1$ folding $u_2$ if $H_2$ obtained from $H$ replacing the edges $u_2u_3, u_3u_4$ with the edges $u_2u_1, u_2u_4$. Notice that if $d^{H_2}(u)=d^{H_1}(u)+1$, thus if $d^{H_1}(u)<\Delta$ then $H_2$ is a nice block self-avoiding walk. Moreover if $\Card{E_{\geq 2}(H_1)}=m_1$ then $\Card{E_{\geq 2}(H_2)}=m_1+1$.
		
		Now notice that each $H\in \text{N}_{w,q,m}$ can be obtained from some $H^*$ in $\nbsaw{s}{t}\cap \cM(\emptyset)$ through a sequence of $m$ foldings. Moreover, we may assume that the first $q$ folding are those fixing the leaves of $H$.
		Let $H^{**}$ be the nice block self-avoiding walk obtained from $H^*$ by folding the $q$ leaves of $H$ and let $S^**$ be the sequence of folding that results in $H$ from $H^{**}$.
		
		Applying $S^**$ to $H^*$ we obtain a nice block self-avoiding walk $H'$ in $\nnbsaw{s}{t}^c$ with $\Card{E_{\geq 2}(H')}= \Card{E_{\geq 2}(H)}-q$. Moreover  \cref{eq:bound-ways-picking-pivots}  bounds the number of ways that a walk $H'$ may arise with this process from non-isomorphic $H\in \text{N}_{w,q,m}$.
		The result follows.

	\end{proof}
\end{lemma}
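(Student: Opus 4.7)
}

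The plan is to first establish the lower bound $q > t - 2t/\sqrt{s}$ purely from the structural constraints, and then construct an ``unfolding'' map from $N_{w,q,m,z}$ to some $\nbsaw{s}{t, d^H\leq \Delta, m',z'}$ that accounts for the multiplicative factors.

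\emph{Step 1: Lower bound on $q$.} Let $H \in N_{w,q,m,z}$ and let $q^*$ denote the number of pivots of $H$ that are incident to at least one edge of multiplicity $2$. Every pivot with a multiplicity-$2$ edge but no degree-$1$ vertex status forces one of its adjacent generating self-avoiding walks of length $s$ to lie entirely inside $E_{\geq 2}(H)$; otherwise that walk would contribute to $\mathcal{W}_1(H)$. Since $H\in\nnbsaw{s}{t}^c$ by definition, we have $w<t/s$, so the count of such ``fully multiplicity-$2$'' walks is at most $w$, which forces $q^* \geq t-2t/s$. Now observe
\begin{equation*}
|E_{\geq 2}(H)| \,=\, m-z \,\geq\, q \,+\, \tfrac{s}{2}\cdot (q^*-q),
\end{equation*}
since each of the $q^*-q$ non-leaf pivots incident to multiplicity-$2$ edges contributes at least $s/2$ such edges along an adjacent fully-multiplicity-$2$ walk. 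If $q\leq q^*-t/\sqrt{s}$, then $m-z\geq t\sqrt{s}/2$, contradicting the hypothesis $m-z<t\sqrt{s}/2$. Hence $q>q^*-t/\sqrt{s}\geq t-2t/\sqrt{s}$.

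\emph{Step 2: The unfolding map.} The key combinatorial move is the inverse of the \emph{folding} operation used elsewhere in the paper: given $H\in N_{w,q,m,z}$ and a leaf $v$ of $H$ (a pivot with $d^H(v)\leq 1$), replace the two incident traversals of the multiplicity-$2$ edge $vu$ by a fresh pair of multiplicity-$1$ edges $vu_1$ and $vu_2$ using a new vertex $v'\in[n]\setminus V(H)$, turning $v$ into an interior vertex of a longer walk. Iterating over all $q$ leaves produces $H'\in \nbsaw{s}{t, d^H\leq \Delta, m', z'}$ with $m'-z' = m-z-q$. The map loses exactly $q$ distinguished vertices among $n$ possible choices, which produces the $n^{-q}$ factor upon inverting.

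\emph{Step 3: Counting the choices in the inverse map.} To reconstruct $H$ from $H'$ we must specify (i) which $w$ of the $t$ generating self-avoiding walks of $H$ lie in $\mathcal{W}_1$, (ii) which $q^*\in[t-2t/s,t]$ pivots of $H$ carry multiplicity-$2$ edges, and (iii) which $q$ among those $q^*$ pivots are leaves. As in the bound \cref{eq:bound-ways-picking-pivots} of the paper,
\begin{equation*}
\binom{t}{w}\binom{t}{q^*}\binom{q^*}{q} \,\leq\, 2^{sw}\cdot 2^{t/\sqrt{s}}\cdot s^{O(t/s)} \,\leq\, 2^{sw}\cdot (1+\delta)^{t/10},
\end{equation*}
where the first factor absorbs the choice of $\mathcal{W}_1$ (since specifying a walk is specifying its $s$ edges, and we can bound loosely by $2^{sw}$), and the remaining $s^{O(t/s)}$ factor is absorbed into $(1+\delta)^{t/10}$ using $s\geq 10^{10}(1/\delta)(\log s)^2$.

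\emph{Main obstacle.} The non-trivial part is Step~2: ensuring that a single unfolding operation both (a) stays within $\nbsaw{s}{t,\; d^H\leq \Delta,\cdot,\cdot}$ (no newly created vertex hits the degree threshold, no cycles of multiplicity $2$ appear, the cycle structure of $E_1$ is preserved), and (b) is globally injective after choosing which pivots play the role of leaves. The degree bound is automatic because $v'$ is fresh. The acyclicity of $E_{\geq 2}$ after unfolding is automatic since we only remove edges there. Injectivity, however, requires carefully recording which $q$ pivots of $H'$ were the ``images'' of leaves so as to recover $H$; this is exactly where the $\binom{q^*}{q}$ factor in Step~3 comes from, and where one must verify that the auxiliary data from Step~3 truly determines $H$ uniquely given $H'$.
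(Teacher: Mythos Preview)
Your proposal follows the paper's approach closely. Step~1 is essentially identical to the paper's argument for the lower bound $q>t-2t/\sqrt{s}$: both introduce $q^*$ (pivots touching multiplicity-$2$ edges), derive $m-z\ge q+(q^*-q)s/2$, use $q^*\ge t-2t/s$ from the constraint $w<t/s$, and finish by contradiction.

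For Steps~2--3, you and the paper use the same folding/unfolding mechanism, just in opposite directions. The paper views each $H\in N_{w,q,m,z}$ as obtained from a pure multiplicity-$1$ cycle $H^*$ via a sequence of $m-z$ folds, then \emph{skips} the $q$ folds that created the leaves to produce $H'$; you instead start from $H$ and directly \emph{unfold} the $q$ leaves. These are inverse descriptions of the same bijective bookkeeping, and both land in some $\nbsaw{s}{t,d^H\le\Delta,m',z'}$ with $m'-z'=m-z-q$. Your combinatorial factor $\binom{t}{w}\binom{t}{q^*}\binom{q^*}{q}$ and the paper's $\binom{q^*}{t-q^*-w}\binom{w}{t-q^*-w}\binom{q^*}{q}$ are bounding the same overcount and both collapse to $(1+\delta)^{t/10}$ via $s\gg (\log s)^2/\delta$; the $2^{ws}$ factor in the statement is slack that neither argument needs to derive tightly (it gets absorbed against $2^{-Asw}$ when the lemma is applied).

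One point to tighten: your description of unfolding (``replace the two traversals of $vu$ by edges $vu_1$, $vu_2$ using a new vertex $v'$'') is ambiguous about what $u_1,u_2$ are. The operation that actually inverts the paper's fold is: in the walk segment $\ldots\to u\to v\to u\to w\to\ldots$ around the leaf $v$, replace the \emph{second} occurrence of $u$ by a fresh $u'$, yielding $\ldots\to u\to v\to u'\to w\to\ldots$. This splits one visit of $u$, not $v$, and you should check that when the edge $uw$ itself has multiplicity $2$ (e.g.\ $w$ is another forest vertex), the operation still decrements $|E_{\geq 2}|$ by exactly one. The paper's proof is itself left incomplete at precisely this point, so your acknowledged ``main obstacle'' is the right place to focus.
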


\subsubsection{Counting nice walks with large degree}
Here we prove \cref{lem:count-nice-walks-large-degree}.
\begin{lemma}\label{lem:count-nice-walks-large-degree-appendix}
	Consider the settings of \cref{thm:sbm-schatten-norm}.  
	Define the set 
	\begin{equation*}
		\begin{split}
			\nbsaw{s}{t,m,z, \ell_1,\ell_2} := \left\{H \in \nbsaw{s}{t,m,z}\suchthat  \Card{\Set{v\in v(H)\suchthat d^H(v)=\Delta+1}}= \ell_1  \right.   \,,\\
			\left. \Card{\Set{v\in v(H)\suchthat d^H(v)=\Delta+2}}= \ell_2 \right\}
			%\left( \int_1^2 ba ba \mathrm da \right. \\
			%\left. ba ba ba \vphantom{\int_1^2} \right)
		\end{split}
	\end{equation*}
	Then for $n$ large enough, there exists $t\geq \ell\geq \ell_1+2\ell_2$ such that
	\begin{align*}
		%\Card{\nbsaw{s}{t,m,z}\setminus \nbsaw{s}{t, d^H\leq \Delta}}\leq \Paren{1+o(1)}n^{st-m+z}3^t
		\Card{\nbsaw{s}{t,m,z,\ell_1,\ell_2}}\leq (1+o(1))\cdot 2^t\cdot n^{-\ell}\Card{\nbsaw{s}{t,d^H\leq \Delta, m,z+\ell}}\,.
	\end{align*}
\end{lemma}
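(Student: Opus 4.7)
The plan is to construct a combinatorial map $\Phi\colon \nbsaw{s}{t,m,z,\ell_1,\ell_2}\to \nbsaw{s}{t,d^H\leq \Delta,m,z+\ell}$ for $\ell=\ell_1+2\ell_2$ that loses at most a factor $n^\ell/2^{O(t)}$ in cardinality. The structural starting point is that in any nice block self-avoiding walk a vertex with $d^H(v)=\Delta+k$ (for $k\in\{1,2\}$) must have $d_1^H(v)=2$ and $d_{\geq 2}^H(v)=\Delta-2+k$, so it lies on the unique multiplicity-$1$ cycle and is the root of a nearly saturated attached multiplicity-$2$ subtree.

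The forward map $\Phi$ performs local surgery at each such vertex $v$ of degree $\Delta+k$: pick $k$ of the $\Delta-2+k$ children of $v$ in the forest, and for each chosen child $c$ (i) delete the multiplicity-$2$ edge $vc$, (ii) splice a fresh vertex $v'\in [n]\setminus V(H)$ into the cycle by replacing one cycle edge of $v$ with the two-edge multiplicity-$1$ path $v\to v'\to b$, and (iii) re-anchor the subtree previously hanging at $c$ through a new multiplicity-$2$ edge $v'c$. At the level of walks, the round-trip $v\to c\to\cdots\to c\to v$ is rewritten as $v'\to c\to\cdots\to c\to v'$, and the two new cycle visits at $v'$ absorb exactly the two timesteps freed by eliminating the multiplicity-$2$ visit at $v$. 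Applying this simultaneously to all $\ell$ chosen incidences preserves $|V(E_{\geq 2})|=m$, the total walk length $st$, and the niceness constraints, while increasing the number of forest components by $\ell$ and reducing every previously large degree back to at most $\Delta$.

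The inverse of $\Phi$ is a contraction: to recover $H$ from $H'\in\nbsaw{s}{t,d^H\leq \Delta,m,z+\ell}$ one must identify which $\ell$ of the degree-$3$ cycle vertices of $H'$ are the inserted $v'$'s, and for the $\Delta+2$ case how to group the $2\ell_2$ inserted vertices into $\ell_2$ pairs originating from the same source vertex. Since every degree-$3$ cycle vertex is a pivot and there are at most $t$ pivots in a nice walk, there are at most $\binom{2t}{\ell}\cdot (2t)^{2\ell_2}\leq 2^{O(t)}$ preimages per image. Simultaneously, each inserted vertex is drawn from $[n]\setminus V(H)$, giving a multiplicative factor $(n-|V(H)|)^{\underline{\ell}}\geq (1-o(1))\,n^\ell$ in the number of labeled images per preimage; dividing through yields the claimed $n^{-\ell}$. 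The $2^{O(t)}$ preimage count is what the factor $2^t$ in the lemma statement refers to, up to a harmless reparametrisation of the implicit constant that is anyway swallowed by the $(1+o(1))$ prefactor when the lemma is applied in \cref{lem:sbm-contribution-nice-walks}.

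The main obstacle is showing that the local surgery produces an \emph{admissible} block self-avoiding walk, that is, a concatenation of exactly $t$ self-avoiding walks each of length exactly $s$. The original walk may visit the high-degree vertex $v$ across several self-avoiding blocks and interleave the round-trips to different subtrees at $v$ with traversals of the cycle, so relocating one round-trip to the freshly inserted $v'$ requires a consistent re-cutting of the block boundaries that pass through the affected segment. The two extra cycle edges at $v'$ supply precisely the two timesteps needed to repair each split, and the constraint $d_1^H(v)\in\{0,2\}$ built into niceness is what ensures that the re-cutting can be performed simultaneously for all $\ell$ modified sites without mutual interference; verifying this simultaneity is the delicate combinatorial step, and is the place where the $2^{O(t)}$ slack in the preimage count is genuinely consumed.
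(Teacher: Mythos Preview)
Your surgery does not preserve the total walk length $st$, which is fatal: the image is no longer an $(s,t)$-block self-avoiding walk. Concretely, you delete the multiplicity-$2$ edge $vc$ (two traversals), add a multiplicity-$2$ edge $v'c$ (two traversals), and \emph{additionally} split one cycle edge $vb$ into the two-edge path $v\to v'\to b$ (one extra traversal). The net effect is $+1$ edge per modified child, so after $\ell=\ell_1+2\ell_2$ surgeries the walk has length $st+\ell$. Your sentence ``the two new cycle visits at $v'$ absorb exactly the two timesteps freed by eliminating the multiplicity-$2$ visit at $v$'' is the place where the bookkeeping slips: relocating the round-trip from $v$ to $v'$ is length-neutral, and no timestep is actually freed at $v$. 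A second casualty of the same miscount is your claim that $m=|V(E_{\geq 2})|$ is preserved: each inserted $v'$ is a new forest root (it carries the multiplicity-$2$ edge $v'c$), while the original $v$ retains its remaining $\Delta-2$ children, so $m$ increases by $\ell$.

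The paper's transformation is different in exactly the way needed to keep the length fixed. Instead of splicing a fresh vertex into the cycle, it \emph{renames} a single occurrence of the high-degree vertex $u$ in the walk sequence to a fresh vertex $w_u$: the segment $\ldots,v,u,u',\ldots$ becomes $\ldots,v,w_u,u',\ldots$ (here $v$ is the last-visited forest child of $u$ and $u'$ the outgoing cycle neighbour). This converts the multiplicity-$2$ edge $uv$ into two multiplicity-$1$ edges $uv$ and $w_uv$, so the cycle grows by two edges while one forest edge disappears---exactly the balance $|E_1|\to|E_1|+2$, $|E_{\geq 2}|\to|E_{\geq 2}|-1$ required by $m'=m$, $z'=z+1$. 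A side effect is that the neighbour $v$ gains one unit of degree, so if $v$ already had degree $\Delta$ the process must be iterated; this is why the lemma only asserts the existence of some $\ell\ge \ell_1+2\ell_2$ (bounded by $t$ since each step increments $z<t$) rather than equality. The preimage bound then comes from recording, for each of the at most $z+\ell\le t$ tree-roots on the cycle in $H'$, whether it was an inserted vertex---hence the clean $2^t$ rather than your $\binom{2t}{\ell}(2t)^{2\ell_2}$, which incidentally is not $2^{O(t)}$ when $\ell_2=\Theta(t)$.
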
 
\begin{proof}
Our plan is to show that for each $H\in \nbsaw{s}{t, m,z,\ell_1,\ell_2}$, we can transform it 
into at least $n^\ell$ different $H'\in \nbsaw{s}{t,d^H\leq \Delta, m,z+\ell}$.
Further we show that for any $H'\in \nbsaw{s}{t,d^H\leq \Delta, m,z+\ell}$, it can be
generated by at most $2^t$ different $H\in \nbsaw{s}{t, m,z,\ell_1,\ell_2}$.

For each vertex $v$ in $H$ with degree larger 
than $\Delta$, it must be contained in the multiplicity-$1$ cycle of $H$. 
For each such vertex $u$, we denote the last multiplicity-2 edge incident to it
in the sequence as $(u,v)$. Further we denote the second multiplicity-1 edge 
incident to it in the sequence as $(u,u')$. Then our transformation strategy
is to replace the last $(u,v)$ edge 
in $H$ with $(v,w_u)$ and $(u,u')$ edge with $(w_u,u)$ where 
$w_u\in [n]\setminus V(H)$. We further require that for different $u$, $w_u$ are 
different. 

We note that the nice block self-avoiding walk after transformation 
is also a nice block self-avoiding walk. Further for each vertex with degree larger than $\Delta$, 
its degree is reduced by $1$(but we need to be cautious that some vertices of 
degree $\Delta$ can have degree $\Delta+1$ after the transformation). 
Finally after performing transformation
for one vertex $v$, the number of connected components of multiplicity-$2$
edges(i.e the value of $z$) is increased by $1$ and the number of different 
vertices in the nice block self-avoiding walk is also increased by $1$.  

We keep iterating on such transformation until every vertex has degree $\leq\Delta$. 
Such process terminates after at most $t$ iterations because for nice block self-avoiding walks
$z<t$. Suppose we do such transformation $\ell$ times. Then the number of connected 
components is increased by $\ell$ the number of vertices in the cycle is increased by $\ell$.
Further the number of vertices in the path remains unchanged. Since there are $(1\pm o(1))n^{\ell}$
 ways of choosing the new vertices added in the transformations, each 
$H\in \nbsaw{s}{t, m,z,\ell_1,\ell_2}$ can be transformed into $(1\pm o(1))n^{\ell}$
$H'\in \nbsaw{s}{t,m,z+\ell}$. 

Next we show that for each $H'\in \nbsaw{s}{t,m,z+\ell}$, it can be generated by at most
$2^t$ different $H\in \nbsaw{s}{t,m,z,\ell_1,\ell_2}$.  Given $H'$, 
if for the $z+\ell$ roots of multiplicity-$2$ trees which are contained in the multiplicity-$1$
cycle, we know whether they are also contained in the multiplicity-$1$ cycle of $H$, 
 then we can identify the unique $H$ from $H'$. Since there are at most $2^t$ choices
  for determining which of the $z+\ell$ roots are contained in the multiplicity-$1$
   cycle of $H$, we can conclude that each $H'\in \nbsaw{s}{t,m,z+\ell}$ can be generated by at most
   $2^t$ different $H\in \nbsaw{s}{t,m,z,\ell_1,\ell_2}$
 
Combining the two facts, we have
\begin{itemize}
	\item each $H'\in \nbsaw{s}{t,m,z+\ell}$ can be generated by at most
	$2^t$ different $H\in \nbsaw{s}{t,m,z,\ell_1,\ell_2}$
	\item each $H\in \nbsaw{s}{t, m,z,\ell_1,\ell_2}$ can be transformed into $(1\pm o(1))n^{\ell}$
	 $H'\in \nbsaw{s}{t,m,z+\ell}$. 
\end{itemize}
It follows that
\begin{align*}
	\Card{\nbsaw{s}{t,m,z,\ell_1,\ell_2}}\leq 2^tn^{-\ell} \Card{\nbsaw{s}{t,m,z+\ell}}
\end{align*}
which fulfills the proof. 
\end{proof}

\subsubsection{Counting nice block self-avoiding walks with short cycles}
We prove the \cref{lem:count-nice-walks-few-edges-multiplicity-1}
\begin{lemma}\label{lem:count-nice-walks-few-edges-multiplicity-1-appendix}
	Consider the settings of \cref{thm:sbm-schatten-norm}. For $A=(1000s)^2$,
	Let $m,z,q>1$ be integers such that $st-2m+2z+q=\frac{t}{10\sqrt{A}}$, then
	\begin{align*}
		\Card{\nbsaw{s}{t,m,z}}\leq 2^{10t}s^t\cdot n^{-m+z}\cdot \Card{\nbsaw{s}{t,0,0}}\,.
	\end{align*}
\end{lemma}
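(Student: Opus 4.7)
The plan is to count $|\nbsaw{s}{t,m,z}|$ directly and compare to a lower bound on $|\nbsaw{s}{t,0,0}|$. The key structural observation is that a walk $H \in \nbsaw{s}{t,m,z}$ visits exactly $(st - 2(m-z)) + (m-z) = st - m + z$ distinct vertices: namely, the $\ell := st - 2(m-z)$ vertices of the multiplicity-$1$ cycle, plus the $m-z$ interior (non-root) vertices of the multiplicity-$2$ trees attached to the cycle. Hence $H$ is determined by a combinatorial shape (the multigraph up to isomorphism together with a valid traversal) plus a choice of $st - m + z$ labels, giving the upper bound $|\nbsaw{s}{t,m,z}| \leq \mathrm{Shape}(s,t,m,z) \cdot n^{st - m + z}$.

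The bulk of the work is showing $\mathrm{Shape}(s,t,m,z) \leq 2^{10t} s^t$. Following \cref{lem:nice-counting-multigraph-lemma} and its proof, the shape is specified by (i) choosing the $z$ attachment points on the cycle of length $\ell$, contributing at most $\binom{\ell}{z} \leq (e \ell/z)^z \leq (es)^t$; (ii) choosing the sizes $m_1,\ldots,m_z$ of the trees, contributing at most $\binom{m+z-1}{z-1} \leq 2^{2t}$; (iii) choosing the shape of each tree, which by the crucial observation that every leaf of a tree must be a pivot of $H$ and $m_i \leq s(t_i + 2)$ (where $t_i$ is the number of pivots contained in tree $i$) can be bounded by $\prod_i \binom{2m_i}{2t_i} 2^{2t_i} \leq (20s)^{2\sum_i t_i} \leq (20s)^{2t}$; and (iv) choosing the traversal start, direction, and pivot positions, contributing at most $(st)^{O(1)}$. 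Multiplying these factors out and absorbing lower-order terms yields $\mathrm{Shape}(s,t,m,z) \leq 2^{10t} s^t$.

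For the lower bound on $|\nbsaw{s}{t,0,0}|$, a walk in this set traverses a cycle of length $st$ on $st$ distinct vertices with $t$ evenly spaced pivots. The number of such walks is at least the number of sequences $(v_1,\ldots,v_{st})$ of distinct vertices (with the first vertex designated as the starting pivot and direction fixed), which is $n(n-1)\cdots(n-st+1) \geq (1 - o(1)) n^{st}$ for $n$ large enough. Combining these estimates gives
\[
\frac{|\nbsaw{s}{t,m,z}|}{|\nbsaw{s}{t,0,0}|} \leq \frac{2^{10t} s^t \cdot n^{st - m + z}}{(1-o(1))\, n^{st}} \leq 2^{10t} s^t \cdot n^{-m+z},
\]
which is the claimed bound.

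The main obstacle is controlling the tree-shape factor so that it grows only like $s^{O(t)}$ rather than like $\prod m_i^{m_i}$ or $3^{st}$. The decisive point is that each leaf of a multiplicity-$2$ tree is forced to be a pivot of the walk (because at a leaf, the walk must reverse, which is only possible at a pivot), so the total number of leaves in the forest is at most $t$, and a rooted tree is essentially determined by its set of at most $t_i$ leaves and a near-linear skeleton. This is exactly the enumeration scheme behind \cref{lem:nice-counting-multigraph-lemma}, so I would import it rather than re-derive it from scratch.
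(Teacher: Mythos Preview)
Your approach is the same as the paper's: count $|\nbsaw{s}{t,m,z}|$ as (number of unlabeled shapes) $\times\, n^{st-m+z}$, then divide by the obvious lower bound $|\nbsaw{s}{t,0,0}| \geq (1-o(1))n^{st}$. The decisive structural point---that every leaf of a multiplicity-$2$ tree must be a pivot, so the forest has at most $t$ leaves---is exactly what the paper uses.

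One bookkeeping discrepancy: importing the tree-shape bound directly from \cref{lem:nice-counting-multigraph-lemma} gives you $\prod_i \binom{2m_i}{2t_i}2^{2t_i}\leq (20s)^{2t}$, which is $\sim s^{2t}$, a factor of $s^t$ larger than the target $2^{10t}s^t$. The paper's proof of the present lemma does \emph{not} reuse that lemma verbatim; instead it enumerates each tree by its degree sequence. Using that the number of pivots in tree $i$ is at most $m_i/s+2$, one gets per tree a factor $(es)^{t_i}\cdot 2^{2m_i/s+2}$, which multiplies over $i$ to $(es)^{\sum_i t_i}\cdot 2^{2m/s+2z}\leq (es)^t\cdot 2^{O(t)}$, i.e.\ a single power of $s^t$. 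The extra $s^t$ in your bound is harmless for every downstream application in the paper (it is swallowed by the $(1+\delta)^{-\Theta(st)}$ factor in \cref{lem:sbm-contribution-nice-walks}), but if you want the lemma exactly as stated you should use this sharper per-tree count rather than the coarser planar-code bound.
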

\begin{proof}
We denote $\ell=st-2m+2z$ as the number of multiplicity-1 edges in the cycle. Then
 we have $z\leq \ell\leq \frac{t}{10\sqrt{A}}$.
We first pick the number of vertices and leaves in each of the $z$ trees. 
Since there are $m$ vertices and $t$ leaves to assign, there are at most 
${m\choose z}{t\choose z}$ choices. Since $z\leq \frac{t}{10\sqrt{A}}$, this 
is bounded by 
$(100As)^{t/10\sqrt{A}}\leq 2^t$. Further we fix the staring point of these trees in the sequence.
This take ${\ell\choose z}\leq 2^{t/10\sqrt{A}}$ choices.

We denote the number of vertices and leaves in the tree $T_i$ as $m_i$ and $t_i$, 
for $i\in [z]$. 
Then for each tree, we note that if the order of vertices and degree of each vertex 
is determined, then the subsequence of this tree in $H$ is totally determined
(For order of vertices, we mean order of the first appearence of vertices in 
the sequence). For determining the vertices and the order, there are at most $n^{m_i}$
 choices. 
 
Next we determine the degrees of the vertices, there are at most 
${m_i\choose t_i}$ ways to choose the leaves. We note that the leaves must be
pivot vertices, and there are at most $m_i/s+2$ pivots of self-avoiding walks
in tree $i$. Therefore we have $t_i\leq m_i/s+2$, and it follows that
${2m_i/s+2\choose t_i}\leq 2^{2m_i/s+2}$. Let $d_{T_i}(v)$ be the degree of vertice
$v$ in the tree $T_i$. 
Then by the degree constraint, we have
\begin{equation*}
	  \sum_{v\in T_i} (d_{T_i}(v)-2) \ind{d_{T_i}(v)\geq 2}= t_i
\end{equation*}
Since there are $m_i-t_i$ vertices with degree at least $2$ in $T_i$, the total number
 of choices for the degrees of these vertices is upper bounded by
\begin{equation*}
	{m_i-t_i+t_i\choose t_i}\leq (es)^{t_i}
\end{equation*}
In all, we have the number of choices for the degrees of vertices upper bounded by
$(es)^{t_i}2^{2m_i/s+2}$. 

Now we multiplying the number of choices for each tree $T_i$, and we have
\begin{equation*}
	\prod_{i=1}^z (es)^{t_i}2^{2m_i/s+2}\leq 2^{10t} s^{t} 
\end{equation*}

Therefore there are at most $2^{10t}s^t n^{m_i}$ choices for the multiplicity-$2$
tree. Finally we choose the $st-2m+z$ vertices which are only incident to multiplicity-$1$ 
edges, which takes at most $n^{st-2m+z}$ choices. Multiplying together, we get the
claim.
\end{proof}

\subsubsection{Counting pairs of block self-avoiding walk}
We prove lemma \Cref{lem:sbm-counting-product-bsaws}	 here. 
The proof is very similar to lemma \Cref{lem:sbm-encoding-of-H-B-equivalent}.

\begin{lemma}\label{lem:tree_high_degree_sum_pair}
	Let $B_i$ be a connected subgraph of the underlying graph of a block self-avoiding walk pair $H\in \bsaw{s}{t,u}\times \bsaw{s}{t,v}$ with $m_i$ vertices. 
	Let $H(B_i)$ be the multigraph on $V(B_i)$ that is induced by $H(B_i)$. Suppose that:
\begin{itemize}
	\item The cut $H(V(B_i),V(H)\setminus V(B_i))$ consists of $\ell_i$ edges of multiplicity 1.
	\item All edges in $H(B_i)$ of multiplicity $\geq 2$ are in $B_i$.
	\item The number of edges of multiplicity 1 in $H(B_i)$ is $q_i$.
	\item The number of edges of multiplicity 2 in $H(B_i)$ is $h_i$.
	\item The edges of multiplicity larger than 2 satisfy
	$$\sum_{\substack{e\in H(B_i)\\ m_H(e)\geq 3}} m_H(e)= p_i.$$
\end{itemize}
	 Then we have
	 \begin{equation*}
		 \sum_{\substack{v\in V(B_i)\\ d^H(v)\geq 3 }} \left(\frac{m_H(v)}{2}-1\right)\leq (8h_i+4p_i)/s+11q_i+8\ell_i+4p_i+6(h_i - m_i +1)\,,
	 \end{equation*}
	 where $d^H(v)$ is the degree of $v$ in the underlying graph $G(H)$, i.e., without counting multiplicities, and
	 $$m_H(v)=\sum_{\substack{e\in E(H),\\e\text{ is incident to }v}}m_H(e).$$

\begin{proof}
For the case where $u=v$, each block self-avoiding walk pair in $H\in \bsaw{s}{t,u}\times \bsaw{s}{t,v}$ is a block self-avoiding walk contained in 
 $M_{s,2t}$. Further the $B_i,q_i,h_i$ and $p_i$ are defined in the same way as the \Cref{lem:tree_high_degree_sum}. Therefore, the bound follows by
 as a corollary of \Cref{lem:tree_high_degree_sum}. 
 
It remains to bound for the case $u\neq v$, which also follows from the same proof.  Let $H=H_1\otimes H_2$ be the decomposition of $H$ into two block self-avoiding walks
$H_1\in \bsaw{s}{t,u}$ and $H_2\in \bsaw{s}{t,v}$. 
Let $B^{\ast}$ be a spanning tree of $B_i$. We start by deriving an upper bound on the number of leaves of $B^{\ast}$. 
Notice that if a vertex $v\in V(B^{\ast})$ is a leaf of $B^{\ast}$, then it must satisfy at least one of the following three conditions:
\begin{itemize}
\item[(a)] $v$ is incident to an edge in the cut $H(V(B_i),V(H)\setminus V(B_i))$.
\item[(b)] $v$ is incident to an edge in $G(H(B_i))\setminus B^{\ast}$.
\item[(c)] $v$ is a pivot of $H_1$ or $H_2$, i.e., $v$ is an end-vertex of one of the $2t$ blocks of self-avoiding walks forming $H$.
\end{itemize}

% We will now upper-bound the number of leaves of each kind:
% \begin{itemize}
% \item Since each edge in $H(V(B_i),V(H)\setminus V(B_i))$ is incident to exactly one vertex in $V(B_i)$, there are at most $\ell_i$ leaves of $B^{\ast}$ satisfying Condition (a).
% \item Since there are at most $\left(q_i+h_i+\frac{p_i}{3}\right)-(m_i-1)$ edges in $G(H(B_i))\setminus B^{\ast}$, and since each edge is incident to exactly two vertices, 
% there are at most $2q_i+2h_i+\frac{2}{3}p_i-2(m_i-1)$ leaves of $B^{\ast}$ satisfying Condition (b).
% \item Each pivot vertex must be an end-vertex of a self-avoiding walk of $H$. There are at most $(q_i+2h_i+p_i)/s$ self-avoiding walks of $H$ that lie entirely in $H(B_i)$, 
% and there are at most $\ell_i$ self-avoiding walks of $H$ that intersect $H(B_i)$ without being entirely in $H(B_i)$. 
% Hence, $H(B_i)$ intersects at most $(q_i+2h_i+p_i)/s+\ell_i$ self-avoiding walks of $H$. Now since each self-avoiding walk contains exactly 2 end-vertices, 
% we deduce that the number of pivots of $H$ in $V(B_i)$ is at most
% \begin{equation}
% \label{eq:tree_high_degree_sum_upperboud_num_pivots}
% 2(q_i+2h_i+p_i)/s+2\ell_i.
% \end{equation}
% We now conclude that there are at most $2(q_i+2h_i+p_i)/s+2\ell_i$ leaves of $B^{\ast}$ satisfying Condition (c).
% \end{itemize}

By the same argument in \Cref{lem:tree_high_degree_sum}, there are at most $\ell_i$ leaves satisfying (a), $2q_i+2h_i+\frac{2}{3}p_i-2(m_i-1)$
 satisfying (b), and $2(q_i+2h_i+p_i)/s+2\ell_i$ leaves satisfying condition (c). 
Therefore the number of leaves of $B^{\ast}$ is at most
\begin{align*}
\ell_i+2q_i+2h_i+\frac{2}{3}p_i-2(m_i-1) + &2(q_i+2h_i+p_i)/s+2\ell_i\\
&\leq (4h_i+2p_i)/s +3\ell_i + 4q_i + p_i + 2(h_i - m_i +1)
\end{align*}

Let $d^{B^{\ast}}(v)$ be the degree of a vertex $v\in B^{\ast}$. By degree constraint the number of leaves in $B^\ast$ is equal to
$$2+\sum_{\substack{v\in V(B_i)\\ d^{B^{\ast}}(v)\geq 3}} (d^{B^{\ast}}(v)-2).$$
Therefore, same as \eqref{eq:tree_high_degree_sum_upperboud_num_leaves}, we have
		 \begin{equation}
			\label{eq:tree_high_degree_sum_upperboud_num_leaves_copy}
			\sum_{\substack{v\in V(B_i)\\ d^{B^{\ast}}(v)\geq 3}} (d^{B^{\ast}}(v)-2)\leq  (4h_i+2p_i)/s +3\ell_i + 4q_i + p_i + 2(h_i - m_i +1).
		\end{equation}
	 
Now define
$$m_{H,2}(v)=\sum_{\substack{e\in E(H),\\e\text{ is incident to }v,\\m_H(e)=2}}m_H(e)\,,$$	 
and
$$m_{H,\neq 2}(v)=\sum_{\substack{e\in E(H),\\e\text{ is incident to }v,\\m_H(e)\neq 2}}m_H(e)\,.$$
	
	Clearly, $m_H(v)=m_{H,2}(v)+m_{H,\neq 2}(v)$. Therefore, by the same argument in \Cref{lem:tree_high_degree_sum}
	 \begin{align*}
		 \sum_{\substack{v\in V(B_i)\\ d^H(v)\geq 3 }} \left(\frac{m_H(v)}{2}-1\right)  & = \sum_{\substack{v\in V(B_i)\\ d^H(v)\geq 3 }} \left(\frac{m_{H,2}(v)}{2}-1\right) + \sum_{\substack{v\in V(B_i)\\ d^H(v)\geq 3 }} \frac{m_{H,\neq 2}(v)}{2} \\
		 &\stackrel{(a)}{\leq} \sum_{\substack{v\in V(B_i)\\ d^H(v)\geq 3 }} \left(d^{B_i}(v)-1\right) + \sum_{v\in V(B_i)} \frac{m_{H,\neq 2}(v)}{2}\\
	     &= \sum_{\substack{v\in V(B_i)\\ d^H(v)\geq 3 }} \left(d^{B^{\ast}}(v)-1\right) + 2\big(|E(B_i)|-|E(B^{\ast})|\big) + \frac{\ell_i}{2} +q_i+p_i\\
		 &\leq \sum_{\substack{v\in V(B_i)\\ d^H(v)\geq 3}}(d^{B^{\ast}}(v)-1)+ 2\left(h_i+\frac{p_i}{3}-(m_i-1)\right) + \frac{\ell_i}{2}+q_i+p_i
		 \\ & \leq \sum_{\substack{v\in V(B_i)\\ d^{B^{\ast}}(v)\geq 3}}(d^{B^{\ast}}(v)-1)+\Bigg(\sum_{\substack{v\in V(B_i)\\ d^H(v)\geq 3\\ d^{B^{\ast}}(v)=2}}1\Bigg)+ 2(h_i-m_i+1) + \ell_i +q_i+2p_i
		 \\ & \stackrel{(b)}{\leq} 2\cdot\sum_{\substack{v\in V(B_i)\\ d^{B^{\ast}}(v)\geq 3}}(d^{B^{\ast}}(v)-2)+(2q_i+\ell_i)+ 2(h_i-m_i+1) + \ell_i +q_i+2p_i
		 \\ & \stackrel{(c)}{\leq} (8h_i+4p_i)/s +6\ell_i + 8q_i + 2p_i + 4(h_i - m_i +1)\\
		 &\quad\quad\quad\quad\quad\quad\quad\quad\quad\quad\quad\quad+ 2(h_i-m_i+1) + 2\ell_i +3q_i+2p_i
		 \\ &=(8h_i+4p_i)/s+11q_i+8\ell_i+4p_i+6(h_i - m_i +1)\,,
	 \end{align*}
	 where (a) follows from the fact that every edge of multiplicity 2 and incident to $v\in V(B_i)$ must be in $B_i$, hence $m_{H,2}(v)\leq 2 d^{B_i}(v)$. 
	 (b) follows from the fact that if $v\in V(B_i)$ satisfies $d^H(v)\geq 3$ and $d^{B_i}(v)=2$, then $v$ must be incident to an edge of multiplicity 1, 
	 i.e., $v$ must be incident to an edge in $E(H(B_i))\setminus E(B_i)$, or an edge in the cut $H(V(B_i),V(H)\setminus V(B_i))$. There are $q_i$ edges in $E(H(B_i))\setminus E(B_i)$, each of which is incident to two vertices in $V(B_i)$, 
	 and there are $\ell_i$ edges  in the cut $H(V(B_i),V(H)\setminus V(B_i))$, each of which is incident to one vertex in $V(B_i)$.
	  (c) follows from \eqref{eq:tree_high_degree_sum_upperboud_num_leaves_copy}.
	 \end{proof}
 \end{lemma}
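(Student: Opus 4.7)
The plan is to reduce the pair case to the single-walk case of Lemma~\ref{lem:tree_high_degree_sum}, handling the two settings $u=v$ and $u\neq v$ separately. Both reductions rely on the observation that the bound in Lemma~\ref{lem:tree_high_degree_sum} depends only on the local quantities $\ell_i, q_i, h_i, p_i, m_i$ attached to $B_i$, and not on $t$, so as long as the pair $H$ can be analyzed by the same spanning-tree-of-$B_i$ argument, the conclusion transfers verbatim.

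First I would treat the easy case $u=v$. In this case, $H_1, H_2\in\bsaw{s}{t,u}$ share the pivot $u$, so the concatenation $H = H_1\oplus H_2$ is itself a single block self-avoiding walk in $\bsaw{s}{2t}$ (it decomposes into $2t$ length-$s$ self-avoiding walks meeting at a common pivot). Applying Lemma~\ref{lem:tree_high_degree_sum} to this walk with $2t$ in place of $t$ yields the desired bound, because the bound itself is independent of the number of blocks.

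The main work is the case $u\neq v$, where $H=H_1\oplus H_2$ is a disjoint union in the block-structure sense and cannot simply be repackaged as a single block self-avoiding walk. Here I would mimic the proof of Lemma~\ref{lem:tree_high_degree_sum} step by step: pick a spanning tree $B^*$ of $B_i$, classify each leaf of $B^*$ into one of the three categories (incident to the cut $H(V(B_i),V(H)\setminus V(B_i))$, incident to an edge of $G(H(B_i))\setminus B^*$, or a pivot of $H_1$ or $H_2$), and bound each count. The first two bounds, $\ell_i$ and $2q_i+2h_i+\tfrac{2}{3}p_i-2(m_i-1)$, are unchanged because they only reference the multigraph structure of $H(B_i)$ and its incident cut.

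The only delicate point, which I expect to be the actual obstacle, is re-proving the pivot bound \cref{eq:tree_high_degree_sum_upperboud_num_pivots} in the pair setting: although $H_1\oplus H_2$ now contains $2t$ self-avoiding walks, the number of those walks that \emph{intersect} $V(B_i)$ is still at most $(q_i+2h_i+p_i)/s + \ell_i$, since each walk entirely inside $H(B_i)$ contributes $s$ edges of $H(B_i)$, and each walk intersecting $H(B_i)$ but not contained in it must use at least one edge of the cut. Hence the number of pivots of $H_1$ or $H_2$ lying in $V(B_i)$ is again bounded by $2(q_i+2h_i+p_i)/s + 2\ell_i$, matching the single-walk case exactly. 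From this identical leaf bound, I would run the same algebraic manipulation as in Lemma~\ref{lem:tree_high_degree_sum} — using $m_H(v) = m_{H,2}(v) + m_{H,\neq 2}(v)$ and relating $d^{B^*}(v)$ to $d^{B_i}(v)$ — to convert the bound on $\sum_{d^{B^*}(v)\geq 3}(d^{B^*}(v)-2)$ into the claimed bound on $\sum_{d^H(v)\geq 3}(m_H(v)/2-1)$.
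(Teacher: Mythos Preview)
Your proposal is correct and matches the paper's proof essentially line for line: the paper also splits into the $u=v$ case (reduced to Lemma~\ref{lem:tree_high_degree_sum} with $2t$ in place of $t$) and the $u\neq v$ case, where it repeats the spanning-tree-of-$B_i$ argument verbatim, with the same three-way leaf classification, the same pivot count $2(q_i+2h_i+p_i)/s+2\ell_i$, and the same $m_{H,2}$/$m_{H,\neq 2}$ algebra. Your identification of the pivot bound as the only point requiring care is exactly right, and your justification for it is the same as the paper's.
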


Now we prove lemma \cref{lem:sbm-counting-product-bsaws}.
\begin{lemma}\label{lem:sbm-counting-product-bsaws-appendix}
	Consider the settings of \cref{thm:sbm-bound-second-moment-large-t}. Let $u,v \in [n]$.
	Let $\cB=\Set{B_1,\ldots,B_z}$ be collections of disjoint connected graphs each with respectively $m_1,\ldots,m_{z}\geq 2$ vertices. Let $B^{uv}$ be a connected graph disjoint from any graph in $\cB$ and with $m_{z+1}\geq 2$ vertices.%Let $\Set{\ell_i,q_i,p_i,h_i}_{i=1^z}$ be a sequence of tuples of integers such that for all $i\in[z]$ $\ell_i,q_i,p_i,h_i\geq 0$. , and let $H\in\cM_{s,t,\Set{\mathcal{T}_i,\Psi_i}_{i=1}^{z}}(\cB)$.
	Let $\Set{\cF_k}_{i=1}^{z+1}$ be a sequence of tuples of integers  as in \cref{def:product-bsaws-shaped}.
	Let $f^*_{s,t},g^*_{s,t}$ be the functions %\Tnote{To update change $p$}
	\begin{align*}
		%f_{s,t}(m, m', \mathcal{F},\Psi) =& \Paren{\Psi}^{2h/s+O(q+\ell+p+1)+2h-2(m-1)}\cdot m^{4\ell+4q+8p+4h+4-4(m'-1)}\cdot (st)^{\ell/2+q}\,,\\
		f^*_{s,t}(m, m', \mathcal{F},\Psi) =& \Paren{\Psi}^{2h/s+10(q+\ell+p+1)+2h-2(m'-1)}\cdot (st)^{5\ell+5q+8p+4h+4-4(m'-1)}\,,\\
		g^*_{s,t}(m', \mathcal{F}) =& n^{-p-\ell/2-q-2h+m'}\,.%\,,
		%q^*_{s,t}(m', \mathcal{F}) =&n^{-p-\ell/4-q-2h+m'}
	\end{align*}
	Let $m=\underset{j \in [z+1]}{\sum}m_j$. %and $\Psi=\max_{j\in [z]}\max_{v\in B_j}d(v)$.
	Then there are at most 
	\begin{align*}
		2n^{2st-2+\ind{u=v}\ind{B_{uv}\neq\emptyset}}\cdot &\underset{1\leq k\leq z'}{\prod}f^*_{s,t}(m,m_k,\mathcal{F}_i,\Psi_i)\cdot g^*_{s,t}(m_i,\mathcal{F}_i,h_i)
	\end{align*} 
	block self-avoiding walk pairs in the set $\cM_{s,t,u,v\Set{\cF_i,\Psi_i}_{i=1}^{z+z'}}(\cB)$.
\end{lemma}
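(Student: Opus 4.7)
The plan is to adapt the counting argument of Lemma~\ref{lem:sbm-encoding-of-H-B-equivalent} to the pair setting, exploiting the fact that a pair $H=H_1\oplus H_2$ with $H_1\in\bsaw{s}{t,u}$ and $H_2\in\bsaw{s}{t,v}$ behaves essentially like a single block self-avoiding walk of total length $2st$ in which two specific pivots are pinned to the prescribed vertices $u$ and $v$. Accordingly, every step of the original argument (choosing the vertex sets of the copies $B_1',\ldots,B_z',B^{uv}{}'$; specifying the multi-graph structure on $V(B')$; deciding how many cut-edges are ``bridges'' vs.\ outgoing/returning; laying down the outside walks of total length $2st-2h-p-q-\ell$; and finally specifying the ordering of the Eulerian walks on each connected piece) goes through verbatim, but with $t$ replaced by $2t$ and with corrections that account for pinning $u$ and $v$.

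First I would handle the case $u=v$ by contracting the two block self-avoiding walks into a single block self-avoiding walk of length $2t$ starting and ending at $u$: this lets me invoke the bound from Lemma~\ref{lem:sbm-encoding-of-H-B-equivalent} with the parameter $t$ replaced by $2t$, and then remove one factor of $n$ because $u$ is now a fixed vertex rather than a free choice. If $B^{uv}\neq\emptyset$, then by the definition of $\cM_{u,v,\Set{\cF_i,\Psi_i}_{i=1}^{z+1}}$ the vertex $u$ must lie in $V(B^{uv})$, so specifying the location of $B^{uv}$ only costs $n^{m_{z+1}-1}$ (instead of $n^{m_{z+1}}$), which yields the exponent $2st-2+\ind{u=v}\ind{B^{uv}\neq\emptyset}=2st-1$. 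If $B^{uv}=\emptyset$, there is no such saving and the exponent remains $2st-2$. The high-degree contribution from the powers of $\Psi_i$ is controlled by the pair version of the leaves-count argument, which is exactly Lemma~\ref{lem:tree_high_degree_sum_pair}: the only change from Lemma~\ref{lem:tree_high_degree_sum} is that $H$ is now two self-avoiding-walks blocks of total count $2t$, which is already absorbed by replacing $t$ by $2t$ everywhere.

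For $u\neq v$ I would run the argument of Lemma~\ref{lem:sbm-encoding-of-H-B-equivalent} on the pair $H=H_1\oplus H_2$ directly. The key modification is that both pivots $u$ and $v$ are pinned: if $B^{uv}\neq\emptyset$ it contains both $u$ and $v$, so the location of $B^{uv}$ is specified by choosing only $m_{z+1}-2$ vertices, i.e.\ costs $n^{m_{z+1}-2}$; if $B^{uv}=\emptyset$ the pivots $u$ and $v$ are among the outside vertices, and again we save $n^{2}$ compared to the free count. In both subcases the net saving relative to $n^{2st}$ is exactly $n^{2}$, giving the exponent $2st-2$. All other factors ($m^{2p+2q+\ell}$ from the multi-graph choices, $(st)^{3\ell/2+q+2}$ from the cut-edge index placement and the first-two-vertices choice, and $\prod_i\Psi_i^{(\ldots)}$ from the Eulerian-walk traversal at high-degree vertices) are identical to the ones obtained in Lemma~\ref{lem:sbm-encoding-of-H-B-equivalent} with $t$ replaced by $2t$, yielding exactly $f^*_{s,t}$ and $g^*_{s,t}$.

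The main obstacle, as in the single-walk case, is not any single calculation but rather a careful bookkeeping of how pinning $u$ and $v$ interacts with the two regimes $B^{uv}=\emptyset$ and $B^{uv}\neq\emptyset$, and with the coincidence $u=v$: these three boolean conditions produce exactly the exponent $2st-2+\ind{u=v}\ind{B^{uv}\neq\emptyset}$ appearing in the statement. The geometric summation over the number of bridge cut-edges, together with the constant factor $\tfrac{1}{1-(st)^{-2}}\leq 2$, accounts for the leading factor $2$. No substantial new idea beyond the template of Lemma~\ref{lem:sbm-encoding-of-H-B-equivalent} is required.
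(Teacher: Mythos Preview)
Your overall strategy—adapt the single-walk counting of Lemma~\ref{lem:sbm-encoding-of-H-B-equivalent} to the pair setting and track the savings from pinning $u$ and $v$—is the same as the paper's. The appeal to Lemma~\ref{lem:tree_high_degree_sum_pair} for the $\Psi_i$ exponents is also exactly what the paper does.

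However, your handling of the case $u=v$ with $B^{uv}=\emptyset$ has a genuine gap. Your reduction ``contract the pair into a single $(s,2t)$-BSAW starting and ending at $u$, then remove one factor of $n$'' only yields an exponent $2st-1$: a generic $(s,2t)$-BSAW with fixed starting pivot $u$ is not forced to pass through $u$ again at time $st$. The pair structure imposes the \emph{additional} constraint $v_{st}=u$, and when $B^{uv}=\emptyset$ (so that $u\notin V(B')$), this constraint is what buys the second factor of $n^{-1}$. Your paragraph claims the exponent ``remains $2st-2$'' in this subcase, but nothing in the preceding sentences establishes that. The paper resolves this by \emph{not} contracting to a single BSAW: it runs the sequential construction on the pair directly and observes that when $u\notin V(B')$, the two returns to $u$ (at positions $st$ and $2st$) act as two extra ``returning edges'', so there are at least $\ell/2+2$ returning edges rather than $\ell/2$, giving $2st-2h-p-q-\ell/2-2$ free outside positions instead of $2st-2h-p-q-\ell/2$. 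This is the missing $n^{-2}$ saving. The case distinction in the paper (``When $B_{uv}$ doesn't exist and $u=v$, by definition the vertex $u\notin V(B')$\ldots there are at least $\ell/2+2$ returning edges'') is exactly the bookkeeping you are skipping.

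A smaller issue: for $u\neq v$, the object $H_1\oplus H_2$ is two disjoint closed walks, not a single BSAW, so you cannot literally ``run the argument of Lemma~\ref{lem:sbm-encoding-of-H-B-equivalent} directly''—you must redo the sequential construction for the pair, as the paper does. This is straightforward, but worth noting.
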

\begin{proof}
	We will prove by analyzing how we can construct a block self-avoiding walk pair $H\in \cM_{s,t,u,v\Set{\cF_i,\Psi_i}_{i=1}^{z+z'}}(\cB)$
	as a sequence of edges $\Set{v_0=u,v_1,v_2,\ldots,v_{2st-1},v_{2st}=v}$.We denote the 
	respective copies of $\Set{B_1,\ldots,B_z,B_{uv}}$ in $H$ as $\Set{B_1^{\prime},\ldots,B_z^{\prime},B_{z+1}^{\prime}}$
	The construction is divided into two steps
	\begin{itemize}
		 \item We first choose each $B_i^{\prime}$ and the underlying graph of $H(V(B_i^{\prime}))$ for $i\in [z+1]$.
		 \item Second we choose the indice of the multiplicity-$1$ edges in the sequence of $H$. Furthermore
		 we decide which of the vertices in the cuts are contained in $V(B_i^\prime)$ for $i\in [z+1]$.
		 \item Finally we construct a sequence of $2st$ vertices $\Set{v_0=u,v_1,v_2,\ldots,v_{2st-1},v}$ as the 
		 block self-avoiding walk pair $H$, 
		respecting the underlying graph of $H$ and indices of cut edges fixed in the first two steps.	 
	\end{itemize}
	We denote the union of the $z$ disjoint graphs $B_1^{\prime},B_2^{\prime},\ldots,B_z^{\prime},B_{z+1}^\prime$ as
	$$B^{\prime}=\bigcup_{i=1}^{z+1} B_i^{\prime}.$$ 
		
	We recap the setting of the theorem. The size of multiway cut $H\Paren{V(B_1^{\prime}),\ldots,V(B_z^{\prime}), V(H)\setminus \Paren{\underset{j \in [z]}{\bigcup}V(B_z^{\prime})}}$ is bounded by $\ell$, and all the
	edges in the cuts have multiplicity $1$. Let $H(B')$ be the multi-graph that is induced by $H$ on $V(B')$. There are $q$ multiplicity-$1$ edges and $h$ multiplicity-$2$ edges in $H(B')$. 
	The edges of multiplicity at least 3 in $H(B')$ satisfy the following equation:
   \begin{equation*}
			\sum_{\substack{e\in H(B^{\prime})\\ m_H(e)\geq 3}} m_H(e)= p.
	\end{equation*}
	
	For convenience of notation, we denote the number of vertices $u,v$ contained in 
	$V(B^{\prime}$ as $c_{u,v}$
	$$c_{u,v}=\ind{u\in V(B^{\prime})}+\ind{u\neq v}\ind{v\in V(B^{\prime})}$$

	For the first construction step, we first choose the vertices of the respective
	copies of $\Set{B_1,\ldots,B_z,B_{uv}}$ in $H$. There are at most $n^{m-c_{uv}}$ choices.

	  Next, we choose the multi-graph $H(B')$ that is induced by $H$ on $V(B')$, i.e., 
	  we have to choose multiplicities for the edges of $B'$ (which must be at least 2),
	   and we have to choose $q$ edges of multiplicity 1 whose both end-vertices must
	lie in $V(B')$. Note that there are $p$ multi-edges in $H(B')$ that correspond 
	to edges of multiplicity at least 3, so specifying these multi-edges is 
	equivalent to specifying the multiplicities of the edges of $B'$, because once 
	we specify the edges of $B'$ of multiplicity at least 3, any remaining edge in 
	$B'$ must have multiplicity-$2$. Since both end-vertices of these multi-edges lie in $V(B')$
	 and since $|V(B')|=m$, there are at most $m^{2p}$ ways to specify the $p$ 
	 multi-edges. For choosing the $q$ multiplicity-$1$ edges, there are at most
	  $m^{2q}$ ways of doing so. 

	% Thus if $u,v$ are contained in $V(B^{\prime})$ then there are at most $m^{2q+2p}n^{m-2+\ind{u=v}}$ choices for the underlying graph of $H(V(B_i^{\prime}))$ for $i\in [z+1]$.
	% Otherwise there are at most $m^{2q+2p}n^{m}$ choices. 
	Thus there are at most $m^{2q+2p}n^{m-c_{u,v}}$ choices for the first step.
	 
	In the second step of construction, we consider $H\setminus H(V(B^{\prime}))$, 
	which consists of edges of multiplicity-1. 
	More precisely, since $H$ is a block self-avoiding walk, the remaining edges 
	of $H$ can be partitioned into a number of disjoint walks of multiplicity 1.  
	Each time the walk of multiplicity 1 starts  
	\begin{itemize}
		\item either by exiting $V(B_i^{\prime})$ for any $i\in [z+1]$ through a cut edge
		\item or from $u$ as the start of $H_1$ and $v_{st+1}=v$ as the start of $H_2$
	\end{itemize}
    and then ends by
	\begin{itemize}
		\item either entering $V(B_j^{\prime})$ through a cut edge for any
		$j\in [z+1]$
		\item or reaching $v_{st}=u$ as the end of $H_1$ or $v_{2st}$ as the end of $H_2$.
	\end{itemize}
	. We call these walks as \emph{outside walks}. We call the first edge of each 
	outside walk as an \emph{outgoing edge}, and we call the last edge of each
	outside walk as a \emph{returning edge}. The remaining edges of the 
	outside walks are called \emph{outside edges}. Note that for $i\neq j$, then an outside walk
	 between $B_i'$ and $B_j'$ can be of length $1$, and in such case the outgoing edge and returning edge will be the same edge.
	
	We decide the index of cutting edges in the block self-avoiding walk pair sequence $H$,
	 and for each vertex incident to an cut edge. Since there are at most $\ell$ cut edges, the number of choices
	 for the indices of cut edges in the sequence of $H$ is bounded by ${2st\choose \ell}$. Further for deciding
	  which of the vertices in the cuts are contained in each of $B_i$ for $i\in [z+1]$, there are
	  at most $(z+1)^{\ell}$ choices. 
	  
	  Finally there are at most $(st)^q$ choices for the indices of the $q$ multiplicity-1 edges
	  contained in $H(V(B_i))$ for $i\in [z+1]$. Thus the total number of choice for the second step is upper bounded by
	  ${2st\choose \ell}(z+1)^{\ell}(st)^q$

	In the third step of construction, for $k\in [0,2st-1]$, given $v_1,v_2,\ldots,v_k$, we choose 
	$v_{k+1}$. For $k=st$, we have $v_{k+1}$ fixed as $v$. Since the second step of construction, 
	we can decide whether $v_k,v_{k+1}\in V(B^{\prime})$ for each $k$. 
	Thus for $k\neq st$, we can divide them into 3 conditions:
	\begin{itemize}
		\item If $v_{k+1}\in V(B_i^{\prime})$ and $v_{k}\not \in V(B_i^{\prime})$, 
		then there are at most $m$ choices for $v_{k+1}$ given $v_k$.
		\item If $v_{k+1}\in V(B_i^{\prime})$ and $v_{k}\in V(B_i^{\prime})$, then there are at most
		$\Psi_i+q_i$ choices for $v_{k+1}$ given $v_k$.	
		\item If $v_{k+1}\not\in V(B^{\prime})$, then there are at most $n$ choices for $v_{k+1}$ given $v_k$.
	\end{itemize}
	The total number of possibilities is upper bounded by the product of choices at each step. 
	
	Since $(v_k,v_{k+1})$ must be a cut for satisfying the first condition,
	there are at most $\ell$ values of $k\in [0,2st-1]$ satisfying the first 
	condition. For each such $k$ there are at most $st$ choices for $v_{k+1}$.

	For the third condition, $v_{k+1}$ is not contained in $V(B^{\prime})$ and $k\neq st$. The number of vertices not contained in $V(B^{\prime})$
	is upper bounded by the number of edges in the outside walks which are not returning edges. 
	There are $2st-2h-p-q$ edges in the outside walks, and there 
	are at least $\ell/2$ returning edges, thus there are at most
	 $2st-2h-p-q-\ell/2$ values of $k$ such that $v_{k+1}$ is not contained in $V(B^{\prime})$. 
	 We consider three cases
	 \begin{itemize}
		 \item When $B_{uv}$ exists, $u,v$ are contained
		 in $V(B^{\prime})$ then there are at least $\ell/2$ returning edges. 
		 \item When $B_{uv}$ doesn't exist and $u=v$, by definition the vertex $u\not\in V(B^{\prime})$
		 (otherwise $B_{uv}$ will exist as one of the $B_{1},B_2,\ldots,B_z$, leading to 
		 contradiction).
		 In this case, there are at least $\ell/2+2$ returning edges. Also for such case $c_{u,v}=0$.
		 \item When $B_{uv}$ doesn't exist and $u\neq v$. In this case there are at least
		 	$\ell/2+2-c_{u,v}$ returning edges, 
	 \end{itemize}
	 
	 Thus when $B_{uv}$ exists, there are at most 
	 $2st-2h-p-q-\ell/2$ values of $k\neq st$ satisfying 
	 the third condition.
	 When $B_{uv}$ doesn't exist, there are
	 at most $2st-2+c_{u,v}-2h-p-q-\ell/2$ values of $k\neq st$ satisfying the third condition. 
	 
	For the second condition, we already decide whether $(v_k,v_{k+1})$ is a multiplicity-1 edge 
	in the second step of construction. If $(v_k,v_{k+1})$ is of multiplicity $1$, then 
	$v_{k+1}$ is already fixed in the second step of construction. Thus we only need to consider
	 $k$ such that $(v_k,v_{k+1})$ has multiplicity $\geq 2$. By product rule for counting, the
	  number of possibilities is bounded by
	\begin{align*}
		\prod_{i=1}^z\prod_{v\in V(B_i')} \Psi_i^{\frac{m_H(v)}{2}-1}&= \prod_{i=1}^z \Psi_i^{\sum_{v\in V(B_i')}\frac{m_H(v)}{2}-1}\\
			&\leq  \prod_{i=1}^z \Psi_i^{(8h_i+4p_i)/s+11q_i+8\ell_i+4p_i+6(h_i - m_i +1)}\,,
	\end{align*}
	where the last inequality follows from \Cref{lem:tree_high_degree_sum_pair}.

	Therefore multiplying together, when $B_{uv}$ is empty, the number of choices for the third step of construction
	 is bounded by
	 \begin{equation*}
		\prod_{i=1}^z \Psi_i^{(8h_i+4p_i)/s+11q_i+8\ell_i+4p_i+6(h_i - m_i +1)}
		n^{2st-2h-p-q-\ell/2-2+c(u,v)} (st)^{\ell}		
	 \end{equation*}
	when $B_{uv}$ exists, the number of choices is bounded by
	 \begin{equation*}
		\prod_{i=1}^z \Psi_i^{(8h_i+4p_i)/s+11q_i+8\ell_i+4p_i+6(h_i - m_i +1)}
		n^{2st-2h-p-q-\ell/2} (st)^{\ell}		
	 \end{equation*}

	 Combining all three construction steps, when $B_{uv}$ exists,
	 the number of choices is bounded by
	 \begin{equation*}
		\prod_{i=1}^z \Psi_i^{(8h_i+4p_i)/s+11q_i+8\ell_i+4p_i+6(h_i - m_i +1)}
		n^{2st-2h-p-q-\ell/2-c(u,v)} (st)^{4\ell+3q+2p}
	 \end{equation*}
	 which is equivalent to 
	 \begin{equation*}
		\prod_{i=1}^z \Psi_i^{(8h_i+4p_i)/s+11q_i+8\ell_i+4p_i+6(h_i - m_i +1)}
		n^{2st-2h-p-q-\ell/2-2+\ind{u=v}} (st)^{4\ell+3q+2p}
	 \end{equation*}
	 When $B_{uv}$ doesn't exist, the number of choices is bounded by
	 \begin{equation*}
		\prod_{i=1}^z \Psi_i^{(8h_i+4p_i)/s+11q_i+8\ell_i+4p_i+6(h_i - m_i +1)}
		n^{2st-2h-p-q-\ell/2-2} (st)^{4\ell+3q+2p}
	 \end{equation*} 
	which proves the claim.
\end{proof}

\subsubsection{Counting nice block self-avoiding walk pairs}
Here we prove lemma \cref{lem:second-moment-counting-nice-multigraphs}
\begin{lemma}\label{lem:second-moment-counting-nice-multigraphs-appendix}
		Consider the settings of \cref{thm:sbm-bound-second-moment-large-t}. Let $u\in [n]$ and 
		Then
		\begin{align*}
			\Card{\nmultig{s,t}{u,u, m}}\leq \frac{m^{10}}{n^m}	\Card{\nmultig{s,t}{u,u, 0}}
		\end{align*}
\end{lemma}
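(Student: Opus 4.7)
The plan is to construct a bounded-multiplicity map
\[
\Phi\colon \nmultig{s,t}{u,u,m}\to\nmultig{s,t}{u,u,0}
\]
that disentangles the two decomposition walks by relabeling the shared non-pivot vertices of $\Ht$ with fresh labels from $[n]\setminus V(H)$, and then to bound the ratio of cardinalities by (maximum preimage size) divided by (number of fresh-label choices), which scales like $n^m$.

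First I would analyze the shared edge set $S=E(\Ho)\cap E(\Ht)$ of size $m$. Because both $\Ho,\Ht\in\nbsaw{s}{t,u}$ are nice and $H=\Ho\oplus\Ht$ satisfies the restrictive conditions of \Cref{lem:sbm-upperbound-negligible-multigraphs} (at most one multiplicity-$1$ cycle, a forest of multiplicity-$2$ edges attached at single vertices, plus at most one special cross-through component $B^{uu}$ joined to the cycle by $4$ edges), the shared subgraph $S$ must fit inside both nice structures simultaneously. A direct case analysis forces $S$ to consist of $O(1)$ connected components, each a subpath of the cycle or a subtree hanging off it; in particular $|V_S|\geq m-O(1)$ for $V_S:=V(S)\setminus\{u\}$. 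I would then define $\Phi(H)=\Ho\oplus\tilde H_{(2)}$ where $\tilde H_{(2)}$ is obtained from $\Ht$ by replacing the vertices of $V_S$, in a fixed canonical order, with the lexicographically smallest labels of $[n]\setminus V(H)$. Since relabeling preserves degrees and multiplicities, $\tilde H_{(2)}$ remains a nice block self-avoiding walk with pivot $u$, and by construction $E(\Ho)\cap E(\tilde H_{(2)})=\emptyset$, so $\Phi(H)\in\nmultig{s,t}{u,u,0}$.

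Next I would bound $|\Phi^{-1}(H')|$ for $H'\in\nmultig{s,t}{u,u,0}$. The key structural observation is that, by rigidity of the nice class, the possible shapes of $S$ inside $\Ho$ form a finite family of templates indexed by $O(1)$ combinatorial data. For each template, the position of $S$ in $\Ho$ is determined by $O(1)$ anchor choices that must lie in $\poly(m)$ distinguished positions of the nice structure (endpoints of shared paths on the cycle, tree-roots attached to the cycle, anchor vertices on the cross-through $B^{uu}$). Collecting these gives $|\Phi^{-1}(H')|\leq m^{10}$, the exponent $10$ carrying comfortable slack. Each preimage in turn uses $|V_S|\geq m-O(1)$ fresh labels drawn from $[n]\setminus V(\Ho)$, contributing an inflation factor $(n-O(st))^{m-O(1)}\geq n^{m-O(1)}$ when we forget which labels were fresh. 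Combining the two estimates and absorbing the $O(1)$ slack into the $m^{10}$ preimage factor yields the claim.

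The hardest part will be showing that the preimage count is only $m^{10}$ rather than $(st)^{O(1)}$: one must argue that the position of $S$ inside $\Ho$ is tightly constrained by its size $m$ together with the rigid attachment conditions (pivot $u$, roots of multiplicity-$2$ subtrees, four-edge cut of $B^{uu}$), rather than free to slide along the $\Theta(st)$-long cycle of $\Ho$. This is where the interplay between the niceness conditions imposed on $H$ and on $\Phi(H)$ is essential, and it is the main combinatorial obstacle; the case analysis for the cross-through component $B^{uu}$ in particular requires checking that every admissible template of $S$ admits the relabeling without violating any pivot-count or tree-attachment constraint on the $\tilde H_{(2)}$-side.
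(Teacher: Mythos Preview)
Your strategy differs from the paper's, which does not use a relabeling map at all. Instead, the paper proves (via a direct multi-parameter encoding) that $\nmultig{s,t}{u,u,m}$ is dominated up to a factor of $2$ by the set $\text{CyclePair}_{u,u,m}$ of pairs of simple cycles through $u$ sharing a length-$m$ path, and then counts $|\text{CyclePair}_{u,u,m}|\asymp n^{2st-m-2}$ and $|\text{CyclePair}_{u,u,0}|\asymp n^{2st-2}$ directly. The comparison is between two explicit counts, not between preimages of a map.

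Your relabeling approach is a reasonable alternative, but as written it has two concrete gaps.

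First, the step ``absorbing the $O(1)$ slack into the $m^{10}$ preimage factor'' does not work: the slack is in the exponent of $n$, so what you would need to absorb is a factor of $n^{O(1)}$, not a polylogarithmic quantity, and $n^{O(1)}$ is certainly not bounded by $m^{10}$. This is exactly the phenomenon the paper singles out in the case $m=st$ (two identical cycles): there $|V_S|=m-1$ and the bound is only $\leq 4/n^{m-1}$, not $\leq m^{10}/n^m$. To salvage your argument you would have to show $|V_S|\geq m$ exactly whenever $S$ is acyclic in $G(\Ho)$, and treat separately the case that $S$ contains the full $E_1(\Ho)$-cycle; you cannot sweep this under an $O(1)$.

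Second, and more seriously, the $m^{10}$ preimage bound is where the entire content of the lemma lives, and you have only sketched why it might hold. To invert $\Phi$ from a given $H'=(\Ho,\tilde H_{(2)})$ you must specify which vertices of $\tilde H_{(2)}$ are the ``fresh'' ones and which vertices of $\Ho$ they are identified with; the claim that these data are determined by $\mathrm{poly}(m)$ anchor choices (rather than $\mathrm{poly}(st)$, since the shared path could a priori sit anywhere along the $\Theta(st)$-long cycle of $\tilde H_{(2)}$) rests on the assertion that $S$ has $O(1)$ components \emph{and} that the niceness constraints on $H$ pin the location of each component to a $\mathrm{poly}(m)$-sized set. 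Neither assertion is proved. In the paper's argument the analogue of this step is exactly the encoding lemma showing $|\nmultig{s,t}{u,u,m}|\leq 2|\text{CyclePair}_{u,u,m}|$, which requires tracking several auxiliary parameters ($p,\ell,k,z,z'$) and summing a geometric series; the same amount of work would be needed to justify your preimage bound. You have correctly identified where the difficulty lies, but the proposal does not yet contain an argument for it.
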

	
	For proving this, our plan is first to show that for every $m$, the size of the set 
	$\nmultig{s,t}{u,u, m}$ is dominated by two multiplity-$1$ cycles sharing $m$ edges. 
	Then we compare the number of such simple block self-avoiding pairs for different
	$m$.  
	\begin{lemma}
		\label{lem:nice-bsaw-simple}
		For every $m\geq 0$, we denote $\text{CyclePair}_{u,u,m}$ as the set of nice block self-avoiding walk pairs which are two
		simple cycles starting from $u$ and only sharing a length-$m$ path containing $u$. Then we have
		\begin{align*}
			\text{CyclePair}_{u,u,m}\leq \Card{\nmultig{s,t}{u,u, m}}\leq 2\text{CyclePair}_{u,u,m}
		\end{align*}
	\end{lemma}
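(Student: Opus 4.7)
The lower bound $\text{CyclePair}_{u,u,m} \leq \Card{\nmultig{s,t}{u,u,m}}$ is immediate: any pair of simple cycles $C_1, C_2$ through $u$ sharing a length-$m$ path through $u$, viewed as the pair $(\Ho, \Ht)$ of closed walks traversing $C_1$ and $C_2$ respectively from $u$, yields a multigraph $H = \Ho \oplus \Ht$ in which the shared path forms $E_{\geq 2}(H)$ (a tree with every edge of multiplicity exactly $2$) and the symmetric difference forms $E_1(H)$ (either two disjoint ``petals'' that are cycles attached to the shared path, or their degenerate versions). All of the conditions in the definition of $\nmultig{s,t}{u,u}$ are then easily verified, and $\Card{E(\Ho) \cap E(\Ht)} = m$.

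For the upper bound, the plan is to construct an explicit map $\Phi : \nmultig{s,t}{u,u,m} \to \text{CyclePair}_{u,u,m}$ and show it is at most $2$-to-$1$. Fix $H \in \nmultig{s,t}{u,u,m}$ with decomposition walks $\Ho, \Ht$, and let $I := E(\Ho) \cap E(\Ht)$ denote the set of $m$ shared edges, all of which have multiplicity exactly $2$ in $H$. First, I would show that $I$ (together with its endpoints) forms a simple path $P$ through $u$, or reduces to the single vertex $u$ when $m = 0$. This step uses the niceness constraints: (i) $E_{\geq 2}(H)$ is a forest; (ii) the special component $B^{uu}$ containing $u$, if non-trivial, is connected to $E_1(H)$ by exactly $4$ edges and at most $2$ vertices, which is consistent only with $I$ forming a simple path attached at two ``junction'' vertices; and (iii) the fact that each of $\Ho, \Ht$ is individually a nice block self-avoiding walk rules out branching inside $I$.

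Next, I would argue that having established that $I$ is a simple path $P$ with $u \in V(P)$ and endpoints $a,b \in V(H)$, the walks $\Ho$ and $\Ht$ each consist of $P$ together with a simple cycle closing from $b$ back to $a$ (disjoint from the other walk except on $P$), because any additional shared vertices would either enlarge $I$ or violate the nice multigraph constraints on $E_1(H)$ (which restrict it to at most two cycles or a path attached to a cycle). This produces two simple cycles $C_1, C_2$ through $u$ whose pairwise intersection equals $P$, so $(C_1, C_2) \in \text{CyclePair}_{u,u,m}$. Set $\Phi(H) := \{C_1, C_2\}$.

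Finally, the at-most-$2$-to-$1$ property follows because the preimage of $\{C_1, C_2\}$ under $\Phi$ consists of at most two ordered decompositions $(\Ho, \Ht)$: one assigning $\Ho$ to traverse $C_1$ and $\Ht$ to traverse $C_2$, and the other swapping the roles. Hence $\Card{\nmultig{s,t}{u,u,m}} \leq 2 \cdot \Card{\text{CyclePair}_{u,u,m}}$. The main obstacle in carrying this out is Step 1: carefully using the rather elaborate definition of $\nmultig{s,t}{u,u}$ to rule out all non-path shared structures --- especially configurations where the shared component $B^{uu}$ could branch, or where $\Ho$ and $\Ht$ share vertices off the shared path $I$ in a manner compatible with niceness. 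Once the path structure is established, the rest of the argument is a direct unpacking of the definitions.
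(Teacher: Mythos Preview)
Your proposed approach has a genuine gap: the structural $2$-to-$1$ map $\Phi$ cannot work because $\nmultig{s,t}{u,u,m}$ contains elements that are \emph{not} pairs of simple cycles. Recall that each decomposition walk $\Ho,\Ht$ is merely required to be a \emph{nice} block self-avoiding walk, and a nice BSAW is a multiplicity-$1$ cycle with (possibly) multiplicity-$2$ trees hanging off of it. Concretely, take $m=0$ and let $\Ho$ be a cycle $C_1$ through $u$ with a small multiplicity-$2$ tree $T_1$ attached at some vertex $w_1\neq u$, and similarly $\Ht$ a cycle $C_2$ through $u$ with a tree $T_2$ attached at $w_2\neq u$, with $V(\Ho)\cap V(\Ht)=\{u\}$. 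Then $H=\Ho\oplus\Ht$ is easily checked to satisfy all the nice-multigraph conditions for $u=v$ (the forest $E_{\geq 2}(H)=T_1\cup T_2$ has each component attached at a single vertex, and $E_1(H)$ is two cycles meeting at $u$), so $H\in\nmultig{s,t}{u,u,0}$. But $\Ho$ is not a simple cycle of length $st$: the tree $T_1$ consumes some of the $st$ steps, so $|C_1|<st$, and there is no way to extract from $H$ a cycle pair in $\text{CyclePair}_{u,u,0}$. Your Step~2 assertion that ``$\Ho$ and $\Ht$ each consist of $P$ together with a simple cycle'' is simply false.

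The paper's proof takes an entirely different route: it is a \emph{counting} argument, not a structural one. First it lower-bounds $|\text{CyclePair}_{u,u,m}|\geq n^{2st-m-2}$ by counting vertex labelings. Then, for a general $H\in\nmultig{s,t}{u,u,m}$, it introduces parameters $p,\ell,k,z,z'$ encoding the total multiplicity and number of mult-$\geq 2$ edges, the number of forest components, and the segment structure of the walk, and shows that any $H\notin\text{CyclePair}_{u,u,m}$ has strictly fewer than $2st-m-2$ distinct vertices (roughly, each extra tree edge or higher multiplicity costs a free vertex). Summing over all parameter configurations yields $|\nmultig{s,t}{u,u,m}\setminus\text{CyclePair}_{u,u,m}|\leq n^{-0.24}\cdot n^{2st-m-2}=o(|\text{CyclePair}_{u,u,m}|)$, whence the factor $2$ for $n$ large. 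In short, the non-cycle-pair elements are not absent---they are merely asymptotically negligible in count, and this is what you would need to prove.
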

	\begin{proof}
	The nice block self-avoiding walks in the set $\nmultig{s,t}{u,u, m}$ can be
    represented as $\{v_0,v_1,\ldots,v_{2st-1},v_{0}\}$, where 
	$v_0,v_1,\ldots,v_{2st-1}\in [n]$ are vertices and $v_0=v_{st}=u$. 

	First we note that for each $H\in \text{CyclePair}_{u,u,m}$, the underlying graph contains
	$2st-m-2$ vertices excluding vertex $u$. Therefore the size of $\text{CyclePair}_{u,u,m}$
	is at least $n^{2st-m-2}$. Further when $m=st$, the size of $\text{CyclePair}_{u,u,m}$
	is at least $n^{2st-m-1}$.

	Now for $H\in \nmultig{s,t}{u,u,m}$, we denote
	\begin{align*}
			p &=\sum_{e\in H} m_e(H)\ind{m_e(H)\geq 2}\\
			\ell &= \sum_{e\in H} \ind{m_e(H)\geq 2} \\
	\end{align*}

	As an ordered sequence of edges, the block self-avoiding walk pairs $H$
	can be decomposed into alternating segments of multiplicity-$\geq 2$ 
	edges(type I) and multiplicity-$1$ edges(type II). For convenience, 
	we view the first segment and last segment of $H$ as the same segment 
	if they are of the same type.
	Then we denote the number of type I segments in $H$ as $z$.
	and the number of connected components formed by multiplicity-$\geq 2$ edges 
	in $H$ as $k$. Further we denote the number of type I segments 
	containing any of the $m$ shared edges in $H$ as $z^{\prime}$. 
	Finally we denote the indicator variable of the event that $v_0$ 
	and $v_{st}$ are both contained in type I segments as 
	$\ind{E}$.

	We note the relation between $k,z$ and $z^{\prime}$. For each connected component containing
	any of the $m$ shared edges in $H$, it must satisfy one of the following conditions
	\begin{itemize}
			 \item it is visited in one segment of $H_1$ 
			 and one segment of $H_2$
			 \item it is visited in a segment containing $v_{st}$ or $v_0$.
	\end{itemize}
	Thus we must have $z-k\geq (z^{\prime}-2)/2$

	We further define $\nmultig{s,t}{u,u,m,k,z,\ind{E}}$ as the subset of 
	$\nmultig{s,t}{u,u,m}$ respecting $k,z,\ind{E}$ as defined above.

	By the above definitions, the number of multiplicity $1$ edges is 
	equal to $2st-p$, and there are at most $2st-p-z$ vertices 
	in the interior of type I segments. Further there are at most 
	$\ell+k$ different vertices contained in the type 
	II segments. Thus there are at most $2st-p+\ell+k-z$ different 
	vertices in $H$. 

	If $\ind{E}=0$, then at least one of $v_0,v_{st}$ are contained in
	type II segments. Thus in this case there are at most 
	$2st-p+\ell+k-z-2$ different vertices 
	other than $u$ contained in $\nmultig{s,t}{u,u,m,k,z,\ind{E}}$. 
	The only case that $p-\ell-k+z=m$ is that $p=2\ell=m$ and $k=z$. In such case
	 $H$ must be two multiplicity-$1$ cycles overlapping a length $m$ path. 
	 The number of such $H\in \nmultig{s,t}{u,u,m,k,z,0}$ is bounded by 
	 $n^{2st-2-m}$. Otherwise we have $2st-p+\ell+k-z-2\leq 2st-2-m-1$

	If $\ind{E}=1$, then both of $v_0,v_{st}$ are contained in type II segments,
    then there are at most $2st-p+\ell+k-z-1$ different vertices 
	other than $u$ contained. Now because $2st-p+\ell+k-z-1=2st-2-m-(p-\ell+k-z-m-1)$,
	there are $3$ cases
	\begin{itemize}
		\item $p-\ell+k-z-m=0$, in this case $H^{(1)}$ and $H^{(2)}$ are two identical
		cycles. For such case, we must have $m=st$, and it corresponds exactly
		to the set $\text{CyclePair}_{u,u,m}$
		\item $p-\ell+k-z-m=1$, in this case $H^{(1)}$ and $H^{(2)}$ are two multiplicity-$1$ 
		cycles sharing a path containing $u$. Thus such case also corresponds to the
		 set $\text{CyclePair}_{u,u,m}$
		\item Finally we have $p-\ell+k-z-m>1$.  
	\end{itemize}

	For proving $\Card{\nmultig{s}{t,u,u,m}}\leq 2\Card{\text{CyclePair}_{u,u,m}}$, we only need to prove that
	for $m\neq st$
	\begin{align*}
		\Card{\nmultig{s}{t,u,u,m}\setminus \text{CyclePair}_{u,u,m}}
		\leq \Card{\text{CyclePair}_{u,u,m}} 
	\end{align*}
	By the above analysis, for any $H\in \nmultig{s}{t,u,u,m}\setminus \text{CyclePair}_{u,u,m}$
	, there are at most $2st-m-2-(p-\ell+k-z-m)/2$ different vertices other than $u$ 
	in $H$.

	For fixed $p,\ell,k,z$, we count the number of such $H\in \nmultig{s,t}{u,u,m,k,z,\ind{E}}$.
	We divide the construction of $H$ as a sequence of vertices into 3 steps
	\begin{itemize}
			 \item In the first step, we put the multiplicity-$\geq 2$ edges which are not
			 shared by $H^{(1)}$ and $H^{(2)}$ in the sequence. 
			 \item In the second step, we put the $m$ shared edges which are shared by
			  $H^{(1)}$ and $H^{(2)}$
			\item In the final step, we label the vertices in $H$ 
	\end{itemize}
	Then we denote
	\begin{align*}
			p_{1} &=\sum_{e\in H^{(1)}\cap H^{(2)}} m_e(H)\ind{m_e(H)\geq 2}\\
			p_2 &=\sum_{e\in H^{(1)}\Delta H^{(2)}} m_e(H)\ind{m_e(H)\geq 2}
	\end{align*}
	It's easy to see $p=p_1+p_2$. 

	For the first step, there are at most $(st)^{2p_2}$ choices. 
	
	% For the second step, we note that the shared edges can at most exist in $z^{\prime}+p_2$ segments in $H$.
	% Therefore there are at most $(st)^{z^{\prime}+p_2}$ choices for fixing the locations of these
	% shared edges in the sequence. 
		
	For the second step, we first assign the multiplicities to the shared edges in $H$. 
	Since each shared edge has multiplicity $2,3$ or $4$, 
	there are at most $(2st)^{p_1-2m}$ choices. For each of the shared edges with
	 multiplicity $3$ or $4$, there are at most $(2st)^{4}$ choices for
	fixing all its locations in the sequence. Therefore there are at most
	$(2st)^{4(p_1-2m)}$ choices for the locations of all shared edges with multiplicity
	at least $3$.   
		
	Next we decide the locations of multiplicity-$2$ shared edges in 
	the sequence. We call the segments of multiplicity-$2$ shared edges
	 in $H$ as type III segments. Since the shared edges are contained 
	in at most $z^{\prime}$ type I segments, and there are at most $p-2m$ 
	shared edges with multiplicity other than $2$,  
	the multiplicity-$2$ shared edges are split into
	at most $z^{\prime}+p-2m$ type III segments. For fixing the locations of these
	 segments in the sequence, we only need to specify the starting indices and
	  ending indices of these segments.  
	Thus there are only
	$(st)^{z^{\prime}+p-2m}$ choices for the locations of these type III segments.

	For each of the multiplicity-$2$ shared edge, they appear exactly once both in 
	$H^{(1)}$ and $H^{(2)}$(where $H=H^{(1)}\otimes H^{(2)}$ is the decomposition of
	nice block self-avoiding walk $H$). For fixing two locations of each multiplicity-$2$ 
	shared edge with multiplicity-$2$, we first choose the set of locations
	taken by multiplicity-$2$ shared edges in $H$, and then match the locations 
	in $H^{(1)}$ with locations in $H^{(2)}$. For choosing the set of locations taken by multiplicity-$2$ shared edges,
	there are at most $(2st)^{p_2}$ ways. 

	Now for the number of ways of matching locations in $H^{(1)}$ and $H^{(2)}$,
	we make the following observation: once for the first and last edges in every type III segment,
	we fix all their locations in the sequence of $H$, then the locations of each
	multiplicity-$2$ shared edge are fixed. For proving this observation, we note that
	for $H\in \nmultig{s}{t,u,u}$, for each vertex incident to a shared multiplicity-$2$
	edge, it must be incident to $2$ multiplicity-$1$ edges in the subgraph of $H^{(1)}$
	and subgraph of $H^{(2)}$ respectively. For any such vertex $v$, there are two possibilities
	for the two edges $e_i,e_{i+1}$ incident to it in $H^{(1)}$
	\begin{itemize}
		\item $e_i,e_{i+1}$ are two consecutive edges in the sequence
		\item each of $e_i,e_{i+1}$ is the first or last edge of some type III segment 
	\end{itemize}
	We denote the correspondance of $e_i,e_{i+1}$ in $H^{(2)}$ as $e_i',e_{i+1}'$.
	For the second case, $e_i',e_{i+1}'$ has been fixed under the setting. For the 
	first case, the correspondance of $e_i,e_{i+1}$ are either consecutive in the sequence 
	of $H^{(2)}$, or has been fixed as the first or last edge of some type III segment.
	Therefore suppose $e_i'$ is fixed, then $e_{i+1}'$ is also fixed. 
		 
	Since for fixing the first and last edge in each type III segment of $H^{(1)}$($H^{(2)}$),
	, there are at most
	$(st)^{4(z^{\prime}+p-2m)}$ choices. We conclude that there are at most 
	$(2st)^{p_2}(st)^{4(z^{\prime}+p-2m)}(st)^{z^{\prime}+p-2m}$ choices for the  
	shared multiplicity-$2$ edges in the sequence. 

	Finally we only need to label at most $2st-m-2-(p-\ell-m+z-k)/2$ vertices other than
	$u$. Thus there are at most $n^{2st-m-2-(p-\ell-m+z-k)/2}$ choices for 
	labelling the vertices.

	Putting together for fixed $p,\ell,k,z,z^{\prime}$, the size of $\nmultig{s}{t,u,u,m}\setminus \text{CyclePair}_{u,u,m}$ is bounded by
	\begin{align*}
		n^{2st-m-2-(p-\ell-m+z-k)/2}(st)^{5(z^{\prime}+p-2m)}(st)^{z^{\prime}+2p_2+4(p_1-2m)+z^{\prime}}
	\end{align*}
	Arranging the terms, this is upper bounded by
	\begin{align*}
		n^{2st-m-2}n^{-(p-\ell-m+z-k)/2}(st)^{7z'+9(p-2m)}
	\end{align*}
	Now we note that $p-2m\geq 2(\ell-m)$, thus $p-\ell-m\geq (p-2m)/2$.
	 Therefore this is upper bounded by
	\begin{align*}
		n^{2st-m-2}n^{-(p-2m)/4}n^{-(z-k)/2}(st)^{7z'+9(p-2m)}
	\end{align*}
	Since we have proved above that $z-k\geq (z'-2)/2$, we can sum this geometric series
	satisfying constraints that $z'\leq 2(z-k)+2,z-k\geq 0, p\geq 2m$ and $p-\ell+z-k>0$.
	\begin{align*}
		& \sum_{p\geq 2m} \sum_{\substack{z-k\geq 0\\z-k+p-2m>0}}\sum_{z'\leq 2(z-k)+2}
		n^{2st-m-2}n^{-(p-2m)/4}n^{-(z-k)/2}(st)^{7z'+9(p-2m)}\\
		\leq & \sum_{p\geq 2m} \sum_{\substack{z-k\geq 0\\z-k\geq 2m+1-p}} 
		n^{2st-m-2}n^{-(p-2m)/4}n^{-(z-k)/2}(st)^{14(z-k)+15+9(p-2m)}\\
		\leq & \sum_{p\geq 2m} n^{2st-m-2} n^{-\text{max}(1,p-2m)/4} (st)^{30+9(p-2m)}\\
		\leq & n^{-0.24}n^{2st-m-2}
	\end{align*}
	Therefore we obtain that 
	$$\nmultig{s}{t,u,u,m}\setminus \text{CyclePair}_{u,u,m}\leq o(\Card{\text{CyclePair}_{u,u,m}})$$
	The claim thus follows. 
\end{proof}

Now the \cref{lem:second-moment-counting-nice-multigraphs} is easy to prove.
\begin{proof}
	By the last lemma, we have
	\begin{align*}
		\Card{\nmultig{s,t}{u,u, m}}\leq 2\Card{\text{CyclePair}_{u,u,m}}
	\end{align*}
	Thus for $m\neq st$,
	\begin{align*}
		 \Card{\nmultig{s,t}{u,u, m}}\leq 4stn^{2st-m-2}
	\end{align*}
	For $m=st$, we have
	\begin{align*}
		\Card{\nmultig{s,t}{u,u,st}}\leq 4n^{st-1}
   \end{align*}	
	Further
	\begin{align*}
		\Card{\nmultig{s,t}{u,u,0}}\geq (1-o(1))n^{2st-m-2}
	\end{align*}
	Therefore it follows that for $m\neq st$
	\begin{align*}
		\Card{\nmultig{s,t}{u,u,m}}\leq \frac{2st}{n^m} \Card{\nmultig{s,t}{u,u, 0}}
	\end{align*}
	and for $m=st$
	\begin{align*}
		\Card{\nmultig{s,t}{u,u,st}}\leq \frac{4}{n^{st-1}} \Card{\nmultig{s,t}{u,u, 0}}
	\end{align*}
\end{proof}

\section{Additional tools}\label{sec:appendix-additional-tools}
We present here some generic tools that are used throughout the paper.
We start by stating  the classical Paley-Zygmund inequality:
\begin{lemma}[Paley-Zygmund inequality]\label{lem:Paley-Zyg}
	If $Z\geq 0$ is a random variable with finite variance, then for $0\leq \theta\leq 1$, we have 
	\[\mathrm{P}(Z>\theta \mathrm{E}[Z]) \geq(1-\theta)^{2} \frac{\mathrm{E}[Z]^{2}}{\mathrm{E}\left[Z^{2}\right]}\]
\end{lemma}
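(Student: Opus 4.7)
The plan is to decompose $\E[Z]$ across the event $\{Z \leq \theta \E[Z]\}$ and its complement, then control each piece separately. Concretely, I would write
\begin{align*}
\E[Z] \;=\; \E\bigl[Z \cdot \mathbf{1}_{\{Z \leq \theta \E[Z]\}}\bigr] + \E\bigl[Z \cdot \mathbf{1}_{\{Z > \theta \E[Z]\}}\bigr].
\end{align*}
The first term is bounded trivially by $\theta \E[Z]$ using the defining inequality on the event.

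For the second term, I would apply the Cauchy--Schwarz inequality to the product $Z \cdot \mathbf{1}_{\{Z > \theta \E[Z]\}}$, which gives
\begin{align*}
\E\bigl[Z \cdot \mathbf{1}_{\{Z > \theta \E[Z]\}}\bigr] \;\leq\; \sqrt{\E[Z^2]} \cdot \sqrt{\Pr(Z > \theta \E[Z])}.
\end{align*}
This uses the fact that $\mathbf{1}^2 = \mathbf{1}$ and that the assumption $\E[Z^2] < \infty$ (from ``finite variance,'' together with $Z \geq 0$) makes the right-hand side well-defined. Combining the two bounds yields
\begin{align*}
(1-\theta) \E[Z] \;\leq\; \sqrt{\E[Z^2]} \cdot \sqrt{\Pr(Z > \theta \E[Z])},
\end{align*}
and squaring then rearranging delivers the claim. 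One minor edge case to note is $\E[Z] = 0$, where the inequality holds trivially (the right-hand side should be interpreted as $0$), so there is no real obstacle.

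There is essentially no hard part here: the only nontrivial ingredient is Cauchy--Schwarz, and the decomposition across $\{Z \leq \theta \E[Z]\}$ is the standard trick. I would keep the writeup to a few lines.
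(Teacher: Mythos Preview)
Your proof is correct and is the standard argument for Paley--Zygmund. The paper itself does not give a proof of this lemma: it is merely stated as a classical inequality in the appendix of additional tools, so there is nothing to compare against beyond noting that your argument is exactly the textbook one.
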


%\section{Linear Algebra}
The next lemma relates different norms in $\R^n$.

\begin{lemma}\label{lem:relationship-between-norms}
	Let $v\in \R^n$ be a vector and let $q< p \geq 1$.
	Then 
	\begin{align*}
		\Norm{v}_p \leq n^{1/p-1/q}\Norm{v}_q\,.
	\end{align*}
	\begin{proof}
		Let $r=q/p$. Applying \Holder's inequality
		\begin{align*}
			\Norm{v}^p_p \leq \Paren{\underset{i\in [n]}{\sum} v_i^{q}}^{1/r}\cdot \Paren{\underset{i \in [n]}{\sum} 1}^{1-1/r}=n^{1-p/q}\Paren{\underset{i\in [n]}{\sum} v_i^{q}}^{p/q}\,.
		\end{align*}
		Taking the $p$-th root the lemma follows.
	\end{proof}
\end{lemma}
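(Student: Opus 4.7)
The plan is a direct application of Hölder's inequality, once the roles of the exponents are reinterpreted correctly. Note that for the conclusion $\Norm{v}_p \leq n^{1/p-1/q}\Norm{v}_q$ to be non-trivial the exponent $1/p - 1/q$ should be nonnegative, so the intended hypothesis is $1 \leq p \leq q$ (the inequality then captures the standard fact that on a space of dimension $n$ the $\ell_p$-norm can exceed the $\ell_q$-norm by at most the dimensional factor $n^{1/p-1/q}$). Under this reading the proof is straightforward.

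Concretely, I would set $r = q/p \geq 1$ together with the conjugate exponent $r' = r/(r-1) = q/(q-p)$, so that $1/r + 1/r' = 1$ and Hölder's inequality applies. Writing each summand as a product $\lvert v_i\rvert^p \cdot 1$, Hölder gives
\begin{align*}
\Norm{v}_p^p \;=\; \sum_{i \in [n]} \lvert v_i\rvert^p \cdot 1
\;\leq\; \Paren{\sum_{i \in [n]} \lvert v_i\rvert^{pr}}^{1/r}\Paren{\sum_{i \in [n]} 1}^{1/r'}
\;=\; \Norm{v}_q^{p}\cdot n^{(q-p)/q}.
\end{align*}
Taking $p$-th roots yields
\[
\Norm{v}_p \;\leq\; n^{(q-p)/(pq)}\cdot \Norm{v}_q \;=\; n^{1/p - 1/q}\cdot \Norm{v}_q,
\]
as required.

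There is no real obstacle: the only point worth checking is that the hypothesis $p \geq 1$ (together with $q \geq p$) indeed guarantees $r = q/p \geq 1$, which is what makes Hölder applicable. The lemma is then used in \cref{lem:scale-free-optimum-lower-bound-matrix-Q} in the concrete form $p=1$, $q=t^{*}$ applied to the vector of eigenvalues of the PSD matrix $Z^{*}$, so that $\Norm{v}_1 = \Tr Z^{*}$ and $\Norm{v}_{t^{*}} = (\Tr Z^{*t^{*}})^{1/t^{*}}$; combined with $t = \tfrac{1}{C^{*}}\log n$ and $1/1 - 1/t^{*} = 1/t$, the dimensional factor becomes exactly $n^{1/t} = e^{C^{*}}$, recovering the inequality used in that application.
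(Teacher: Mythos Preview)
Your proof is correct and follows exactly the same route as the paper: apply H\"older with exponent $r=q/p$ to $\sum_i |v_i|^p\cdot 1$ and take $p$-th roots. Your remark that the hypothesis should read $1\le p\le q$ (so that $r\ge 1$ and $1/p-1/q\ge 0$) is also correct; the paper's ``$q<p\ge 1$'' is evidently a typo, and your reading matches the intended use in \cref{lem:scale-free-optimum-lower-bound-matrix-Q}.
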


\begin{lemma}[Courant-Fischer min-max theorem]\label{thm:courant-min-max-theorem}
    Let $Z$ be an $n\times n$ Hermitian matrix with eigenvalues $\lambda_1\leq \lambda_2\ldots\leq\lambda_k\leq\ldots\leq \lambda_n$, then we have
    \begin{equation*}
        \lambda_{k+1}=\max _{U}\left\{\min _{x}\left\{R_{Z}(x) \mid x \in U \text { and } x \neq 0\right\} \mid \operatorname{dim}(U)=n-k\right\}
    \end{equation*}
    where $R_{Z}(x)=\frac{\iprod{Z,xx^\top}}{\Norm{x}^2}$
\end{lemma}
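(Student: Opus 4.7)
The plan is to prove the identity by sandwiching $\lambda_{k+1}$ between two explicit bounds: one achieved by a clever choice of subspace, and one that holds for every subspace of dimension $n-k$. Throughout I will use the spectral decomposition $Z = \sum_{i=1}^{n} \lambda_i v_i v_i^\top$, where $v_1,\ldots,v_n$ are orthonormal eigenvectors associated to the ordered eigenvalues $\lambda_1\leq\lambda_2\leq\cdots\leq\lambda_n$. The key observation is that for any $x = \sum_i c_i v_i$ one has the diagonal-form Rayleigh quotient
\begin{equation*}
R_Z(x) \;=\; \frac{\sum_{i=1}^{n}\lambda_i c_i^2}{\sum_{i=1}^{n} c_i^2},
\end{equation*}
which makes monotonicity arguments transparent.

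First I would prove the \emph{lower bound} $\lambda_{k+1}\leq \max_U\min_{x\in U\setminus\{0\}} R_Z(x)$ by exhibiting a specific witness subspace. Take $U^{\star} := \mathrm{span}(v_{k+1},\ldots,v_n)$, which has dimension exactly $n-k$. Any nonzero $x\in U^{\star}$ expands as $x = \sum_{i=k+1}^{n} c_i v_i$, and the displayed formula for $R_Z(x)$ is a convex combination of $\{\lambda_{k+1},\ldots,\lambda_n\}$, hence at least $\lambda_{k+1}$. Consequently $\min_{x\in U^{\star}\setminus\{0\}} R_Z(x) \geq \lambda_{k+1}$, and the max over $U$ is at least this value.

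Next I would prove the \emph{upper bound} $\lambda_{k+1}\geq \max_U\min_{x\in U\setminus\{0\}} R_Z(x)$. Fix any subspace $U$ with $\dim U = n-k$, and let $W := \mathrm{span}(v_1,\ldots,v_{k+1})$, so $\dim W = k+1$. By the dimension formula,
\begin{equation*}
\dim(U\cap W) \;\geq\; \dim U + \dim W - n \;=\; (n-k) + (k+1) - n \;=\; 1,
\end{equation*}
so there exists a nonzero $x\in U\cap W$. Writing $x = \sum_{i=1}^{k+1} c_i v_i$, the Rayleigh quotient is now a convex combination of $\{\lambda_1,\ldots,\lambda_{k+1}\}$, so $R_Z(x)\leq \lambda_{k+1}$. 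Hence $\min_{x\in U\setminus\{0\}} R_Z(x)\leq \lambda_{k+1}$ for every admissible $U$, and the same bound holds after taking the maximum.

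Combining the two inequalities yields the desired equality, and in fact shows that the maximum is attained at $U = U^{\star}$. There is no serious obstacle here: the only subtle point is the dimension-counting argument, which is routine but essential because it guarantees the intersection $U\cap W$ is nontrivial and thus produces a test vector for the upper bound.
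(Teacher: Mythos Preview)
Your proof is correct and is the standard textbook argument for the Courant--Fischer theorem. The paper does not actually prove this statement: it is listed in the ``Additional tools'' appendix as a classical result quoted without proof, so there is no paper proof to compare against.
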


The fact below counts the number of non-isomorphic trees with bounded number of leaves.
%\section{Combinatorial tools}

\begin{fact}\label{fact:bound-unlabeled-trees}
	Let $2\leq s \leq t$.
	The number of non-isomorphic trees on $st$ vertices with at most $t$ leaves, denoted by $T(st,t)$ is at most
	\begin{align*}
		T(st,t)\leq 2t\cdot \Paren{8e\cdot s  }^{2t}\,.
	\end{align*}
	\begin{proof}
		Each non-isomorphic tree can be encoded with a length-$2st$ planar code.
		There is a one-to-one mapping between leaves in the graph and flips from $1$ to $0$ in the code. Hence there are at most $2t$ flips in the code.
		The number of length-$2st$ binary codes with at most $2t$ flips is bounded by
			\begin{align*}
				\binom{2st}{2t} \cdot 2^{2t}\leq  \Paren{8e\cdot  s}^{2t}\,.
			\end{align*}
		%The number of length-$2q$ binary codes with at most $2t$ flips is bounded by
	%	\begin{align*}
		%	\underset{q\leq 2t}{\sum}\binom{2st}{q} \cdot 2^{2t}\leq 2t \cdot \Paren{10 s}^{2t}\,.
	%	\end{align*}
	\end{proof}
\end{fact}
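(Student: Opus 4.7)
The plan is to encode each non-isomorphic tree $T$ on $st$ vertices as a binary string of length $2st$ via a canonical depth-first search traversal, then argue that the bound on the number of leaves forces the encoding to have few "flips," and finally count such encodings.

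First, I would pick an arbitrary but canonical root for $T$ (say the centroid, with ties broken by some canonical choice), and order the children of every vertex in a canonical way (e.g., lexicographically by the isomorphism class of the subtree rooted at that child). This turns $T$ into a rooted plane tree. I would then use the standard DFS encoding: traverse the tree in depth-first order, write a $1$ each time an edge is traversed downward (away from the root) and a $0$ each time an edge is traversed upward. Each of the $st-1$ edges is traversed exactly twice, so this produces a binary string of length $2(st-1) \leq 2st$. The mapping from non-isomorphic trees to strings is injective once the canonical choices are fixed.

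Next I would relate leaves to flips. In the DFS encoding, a leaf corresponds exactly to a place where we descend to it and then immediately ascend, i.e.\ a $10$ pattern in the code. If $T$ has at most $t$ leaves, then there are at most $t$ positions where the string flips from $1$ to $0$; together with the at most $t$ symmetric flips from $0$ to $1$ (one per "completion" of a leaf visit, up to a boundary term), the total number of flips in the code is at most $2t$.

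Finally I would count binary strings of length at most $2st$ with at most $2t$ flips. Such a string is determined by the multiset of flip positions (at most $\binom{2st}{2t}$ choices) together with some bounded additional information (the starting bit and possibly which of the $\leq 2t$ flip slots are actually used), which contributes a factor bounded by $2^{2t}$. Using $\binom{2st}{2t} \leq (e \cdot s)^{2t}$ and multiplying by the extra $2^{2t}$, one obtains the claimed bound $2t \cdot (8e \cdot s)^{2t}$.

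\paragraph{Main obstacle.} The main subtlety is making the "leaves-to-flips" correspondence precise: one has to be careful about the canonical choice of root and child-ordering so that the encoding is well-defined on isomorphism classes (the injection must be from isomorphism classes, not from labeled trees), and one has to account correctly for boundary effects in the DFS traversal (the very first and very last symbols) when bounding the number of $0$-to-$1$ flips by the number of leaves. Once those bookkeeping issues are settled, the counting step is a routine binomial estimate.
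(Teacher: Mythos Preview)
Your proposal is correct and follows essentially the same approach as the paper: encode the tree by a length-$2st$ planar (DFS) code, observe that leaves correspond to $1\to 0$ flips so the code has at most $2t$ flips, and bound the number of such codes by $\binom{2st}{2t}\cdot 2^{2t}$. The paper's proof is terser and does not spell out the canonical rooting/ordering or the boundary bookkeeping you flag, but the argument is the same.
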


\end{document}